\definecolor{myorange}{RGB}{245,156,74}
\crefname{definition}{Definition}{Definitions}
\Crefname{definition}{Definition}{Definitions}
\crefname{assumption}{Assumption}{Assumptions}
\Crefname{assumption}{Assumption}{Assumptions}
\crefname{equation}{Eq.}{Eqs.}
\Crefname{equation}{Equation}{Equations}
\crefname{section}{Section}{Sections}
\Crefname{section}{Section}{Sections}
\crefname{subsection}{Section}{Sections} 
\Crefname{subsection}{Section}{Sections}
\crefname{figure}{Fig.}{Figs.}
\Crefname{figure}{Figure}{Figures}
\crefname{table}{Table}{Tables}
\Crefname{table}{Table}{Tables}
\crefname{align}{Eq.}{Eqs.}
\Crefname{align}{Equation}{Equations}
\crefname{fact}{Fact}{Facts}
\Crefname{fact}{Fact}{Facts}
\crefname{induction}{Induction}{Inductions}
\Crefname{induction}{Induction}{Inductions}
\newtheorem{theorem}{Theorem}[section]
\newtheorem{lemma}{Lemma}[section]
\newtheorem{induction}{Induction}[section]
\newtheorem{corollary}{Corollary}[section]
\newtheorem{remark}{Remark}[section]
\theoremstyle{definition}
\newtheorem{definition}{Definition}[section]
\newtheorem{assumption}{Assumption}[section]
\newtheorem{fact}{Fact}[section]
\newtheorem{claim}{Claim}[section]
\renewcommand\paragraph{\@startsection{paragraph}{4}{\z@}
  {0.25\baselineskip}
  {-0.6em}
  {\normalfont\normalsize\bfseries}}
\definecolor{cqAccent}{HTML}{8AB4F8}
\colorlet{cqAccentFrame}{cqAccent!30}
\colorlet{cqBack}{cqAccent!10}
\colorlet{cqTitle}{cqAccent!25}
\definecolor{babyblueeyes}{rgb}{0.63, 0.79, 0.95}
\newtcolorbox{centralquestion}[1][]{
  left=7mm,
  right=7mm,
  top=2mm,
  bottom=2mm,
  fontupper=\itshape,
  coltitle=black,
  attach boxed title to top left={xshift=2mm, yshift*=-2mm},
  boxed title style={
    colback=cqTitle,
    colframe=cqAccentFrame,
    boxrule=0.35pt,
    arc=1mm,
    top=1pt, bottom=1pt, left=4pt, right=4pt
  },
  #1
}
\newtcolorbox{inlineheadbox}[2][]{%
  myframe,
  title={#2},
  fonttitle=\bfseries,
  boxed title style={
    frame empty, boxrule=0pt, size=minimal,
    interior style={fill=white},
    left=1mm, right=1mm, top=0pt, bottom=0pt
  },
  attach boxed title to top center={yshift=-\tcboxedtitleheight/2},
  #1
}
\tikzset{>=Stealth}
\newcommand{\hpad}{10mm}
\newcommand{\vpad}{9mm}
\newcommand{\eqstrut}{\vphantom{\displaystyle A}}
\newcommand{\mboxed}[1]{\boxed{#1\eqstrut}}
\title{Transformers Provably Learn Chain-of-Thought Reasoning \\with Length Generalization}
\author{
  \begin{tabular}{c}
  Yu Huang\footnote{Equal contribution, ordering determined by coin flip. This is the full version of a paper that was published at NeurIPS 2025.} \thanks{\hangindent=1.9em\hangafter=1  Department of Statistics and Data Science, Wharton School, University of Pennsylvania.  Email: \texttt{\{yuh42, yuxinc\}@wharton.upenn.edu} } \hspace{1.5em}
    Zixin Wen\footnotemark[1] \thanks{  Machine Learning Department, Carnegie Mellon University. Email: \texttt{\{zixinw@andrew, aarti@cs\}.cmu.edu} } \\[1em]
    Aarti Singh\footnotemark[3] \hspace{1.5em}
    Yuejie Chi\thanks{  Department of Statistics and Data Science, Yale University. Email: \texttt{yuejie.chi@yale.edu} } \hspace{1.5em}
    Yuxin Chen\footnotemark[2] \vspace{0.5em}
  \end{tabular}
}
\date{\today}
\begin{document}

\maketitle

\begin{abstract}
The ability to reason lies at the core of artificial intelligence (AI), and challenging problems usually call for deeper and longer reasoning to tackle. A crucial question about AI reasoning is whether models can extrapolate learned reasoning patterns to solve harder tasks with longer chain-of-thought (CoT). In this work, we present a theoretical analysis of transformers learning on synthetic state-tracking tasks with gradient descent. We mathematically prove how the \emph{algebraic structure} of state-tracking problems governs the degree of extrapolation of the learned CoT. Specifically, our theory characterizes the length generalization of transformers through the mechanism of \emph{attention concentration}, linking the retrieval robustness of the attention layer to the state-tracking task structure of long-context reasoning. Moreover, for transformers with limited reasoning length, we prove that a \emph{recursive self-training} scheme can progressively extend the range of solvable problem lengths. To our knowledge, we provide the first \emph{optimization guarantee} that constant-depth transformers provably learn \(\mathsf{NC}^1\)-complete problems with CoT, significantly going beyond prior art confined in \(\mathsf{TC}^0\), unless the widely held conjecture \(\mathsf{TC}^0 \neq \mathsf{NC}^1\) fails. Finally, we present a broad set of experiments supporting our theoretical results, confirming the length generalization behaviors and the mechanism of attention concentration.

\end{abstract}

\medskip


\medskip

\setcounter{tocdepth}{2}
\startcontents


\hypersetup{
citecolor=MidnightBlue,filecolor=blue,linkcolor=BrickRed,urlcolor=blue}
\section{Introduction}
\label{sec:intro}

Reasoning is central to artificial intelligence~\cite{chowdhery2023palm,brown2020language,chen2021evaluating,lewkowycz2022solving,gunasekar2023textbooks,touvron2023llama}.
Transformer-based~\cite{Vaswani2017AttentionIA} large language models (LLMs) achieve state-of-the-art results on complex reasoning tasks
via chain-of-thought (CoT) reasoning~\cite{wei2023chain,magister2022teaching,wang2022self,cobbe2021training,reynolds2021prompt,kojima2022large,madaan2022text,zhou2022least},
where the model generates intermediate steps before delivering a final answer. Recent frontier models such as OpenAI-o1~\cite{openai2024o1card} and DeepSeek-R1~\cite{deepseekai2025deepseekr1}
typically produce long CoT traces at inference time, often elicited via reinforcement learning and/or supervised fine-tuning (SFT)
that distills from longer chains~\cite{qin2024o1,min2024imitate}.
These advances have enabled improved performance on more challenging  problems~\cite{kumar2024training,xie2024monte,zhong2024evaluation,gao2025comparison,jones2025large},
yet the mechanisms and limitations underlying CoT reasoning remain poorly understood,  posing fundamental theoretical challenges.

Theoretical studies on transformers with CoT have recently advanced along two fronts: expressiveness~\cite{feng2023revealing, li2024chainthought,merrill2024expressive, chen2024theoretical, merrill2023parallelism} and statistical learnability~\cite{abbe2024fartransformers, wies2023subtask, sanford2024understanding, kim2025metastable, hu2024unveiling, prystawski2023think, li2023dissecting}.
On the expressiveness front, seminal work~\cite{feng2023revealing, li2024chainthought, merrill2024expressive} showed that constant-depth
transformers without CoT behave as shallow circuits and are restricted to express the circuit complexity class \(\mathsf{TC}^0\) of constant computation depth, whereas with \(O(L)\) CoT steps on inputs of length \(L\), they can express log-depth circuits in \(\mathsf{NC}^1\), which is a problem class conjectured to require inherently serial computation.\footnote{For background on circuit complexity and a detailed review of
expressiveness results, see \Cref{sec:problem} and \cite{vollmer1999introduction,arora2006computational}.}  These results reveal that CoT reasoning equips transformers with the expressive power to solve {\bf inherently sequential} problems. In stark contrast, there remains limited understanding of \emph{how} transformers acquire such reasoning abilities during training. Prior optimization analyses~\cite{kim2025transformersprovably,wen2025sparse,huang2025transformers,huang2025transformerslearn} were limited to simple, fully parallelizable tasks in \(\mathsf{TC}^0\) that do not require sequential reasoning. This leaves a substantial gap between what transformers can \emph{express} in principle and what they can \emph{learn} through training.

Another key question, motivated by the success of reasoning models, is whether large models can extrapolate their reasoning beyond the sequence lengths of the training data: a feature known as \textbf{length generalization}. To fully exploit the potential of long CoT reasoning, the model must harness \emph{long-context ability} \cite{kuratov2024babilong, yan2025inftythink, ling2025longreason, yang2025longer}, which can be severely affected by a phenomenon called \emph{context rot}: namely, a phenomenon in which model performance degrades as the number of tokens in the context increases~\cite{hsieh2024ruler, hong2025context, anthropic2025effective}. For reasoning tasks, empirical evidence on length generalization of transformers is mixed~\cite{zhang2022unveiling, liu2023transformers, jelassi2023length, zhou2023algorithms, zhou2024transformers, hou2024universal, xiao2025generalizing}, and several prior studies reported limited extrapolation despite strong in-distribution performance. Architectural choices, including positional encoding and attention variants, can influence generalization considerably~\cite{kazemnejad2023impact,sabbaghi2024explicitly,lee2025selfimproving,anil2022exploring,shaw2018self,raffel2020exploring,dziri2023faith}. On the theoretical front, previous work established existence or statistical guarantees, showing that transformers can, in principle, represent length-generalizing algorithms or achieve favorable sample complexity independent of the length of the CoT~\cite{marsden2024provable,ahuja2024provable,huang2024formal,golowich2025role,joshi2025theory}. Nonetheless, it remains unclear whether transformers can provably learn to length-generalize reliably when trained with gradient-based optimization algorithms.

In light of the aforementioned gaps in prior literature, the current paper seeks to make progress towards addressing the following two fundamental questions:
\vspace{-0.05in}

\begin{centralquestion}[title = {\bf Research Questions}]
\begin{enumerate}[itemsep=0pt,topsep=0pt,leftmargin=1em]
    \item \emph{Can transformers, trained via gradient descent (GD), learn CoT reasoning to solve problems requiring {\it inherently} sequential reasoning beyond \(\mathsf{TC}^0\)?}
    \item \emph{Can the learned reasoning ability {\it generalize} to problems that require longer CoTs beyond the lengths of training data?}
\end{enumerate}
\end{centralquestion}

To address these questions in a theoretically tractable manner, we analyze a minimally viable
transformer: a one-layer transformer block with softmax attention and a
feed-forward network (FFN), trained by GD
with \emph{no positional encoding} (NoPE).  We study
this model on synthetic \emph{state-tracking} tasks, namely LEGO~\cite{zhang2022unveiling}, which distill core LLM
skills such as entity tracking, game-state updates, and code evaluation
\cite{kim2023entity, merrill2024illusion}.  This setup is tractable for analysis, while capturing the mechanisms needed for step-by-step computation via
CoT.  We analyze the training dynamics with CoT on two
LEGO task families with distinct
\emph{algebraic} action structures: simply transitive group actions, and symmetry group actions.  By tracking attention patterns
throughout training, we demonstrate how reasoning capabilities emerge and how
length generalization is governed by the structural properties of these
tasks. More concretely, our main contributions are summarized as follows.
\begin{enumerate}[leftmargin=1.5em]
    \item 
    \textbf{Provable guarantee of learning CoT with length generalization in state-tracking.} We prove that for the LEGO state-tracking task, a one-layer NoPE transformer trained with GD provably solves constant-length problems using CoT reasoning. Moreover, the learned transformer directly generalizes to problems of substantially longer length, when the group actions in LEGO is \emph{simply transitive}. Conversely, for the canonical action of the symmetry group \(S_n\) on $\mathbb{Z}_n$, the learned transformer generalizes only up to constant-factor length. We identify an \textbf{attention concentration} mechanism that dictates step-wise retrieval depending on the task structure, leading to distinct length generalization behaviors.

    \item \textbf{Recursive self-training provably extends the solvable reasoning length.}
    When length generalization is limited (for example, under symmetry group actions), we introduce a \emph{self-training} curriculum that recursively trains the model on its own CoT traces, motivated by the empirical work \cite{lee2025selfimproving}. We prove that this scheme can bootstrap the solvable problem length up to maximal allowable length after sufficient rounds of self-training, thus offering a theoretical guarantee of recursive self-improvement.
    
    \item \textbf{Constant-depth transformers provably learn to solve problems beyond \(\mathsf{TC}^0\) via CoT.} 
    The first two results further establish that the model can learn the solution of state-tracking problems for non-solvable groups, which is \(\mathsf{NC}^1\)-complete and lies outside \(\mathsf{TC}^0\) unless the widely held conjecture \(\mathsf{TC}^0 \neq \mathsf{NC}^1\) in circuit complexity theory fails. Therefore, we provide \emph{the first optimization guarantee} showing that a one-layer transformer learns to solve reasoning tasks beyond \(\mathsf{TC}^0\) with CoT, matching the expressivity result of \cite{feng2023revealing, merrill2024expressive, li2024chainthought} for linear-CoT transformers.

    \item {\bf Empirical evidence on synthetic LEGO tasks that supports our theory.} We present a wide range of experiments based on our theoretical setup. Our results corroborate our predictions on length generalization for different group actions, showing a clear separation between the two algebraic structures in the theoretical setting. Moreover, we empirically demonstrate how recursive self-training effectively improves reasoning length. The attention concentration mechanism identified in our theory is also supported by our experimental findings. 

\end{enumerate}

\subsection{Overview of Theoretical Results}

To facilitate discussion, 
let us first briefly introduce the \emph{state-tracking problem}. Given a group \(\mathcal{G}\) acting on a state space \(\mathcal{Y}\), the goal is to compute the final state $y_L$ by applying a sequence of group actions $g_i \in \mathcal{G}$ starting from an initial state $y_0 \in \mathcal{Y}$: 
\begin{equation}
    y_0 \xrightarrow{g_1} y_1 \xrightarrow{g_2} \cdots \xrightarrow{g_{L'}} y_{L'} \cdots \xrightarrow{g_{L}} y_{L}. \notag
\end{equation}
This task naturally lends itself to CoT reasoning, where intermediate steps explicitly track how the state evolves under successive actions.

We consider two types of group action: simply transitive actions and symmetry group actions \cite{fraleigh2023first}. 
A simply transitive action is free and transitive; that is, there is {\em a unique} group element \(g \in \cG\) mapping any state to another. An example is a group acting on itself by the group composition. In contrast, in the symmetry case, the action is \emph{transitive but not free}: for example, consider the canonical action of $S_n$ on $\mathbb{Z}_n$ by permutations, where {\em many} group elements send a given number $i$ to $j$. In group theory, these two types of actions have different sizes of stabilizers, even when the group \(\cG\) is the same.

\paragraph{\underline{Result 1:} Provable CoT learning for state tracking.}
Our first result shows that transformers can learn to solve these two state-tracking problems via CoT. 
\begin{theorem}[Learning CoT, informal]
One-layer transformers,  trained via GD, can provably learn to solve state tracking problems for simply transitive and symmetry group actions via CoT reasoning.
\end{theorem}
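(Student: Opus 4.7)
The plan is to reduce the CoT learning problem to showing that a one-layer NoPE transformer learns the \emph{one-step transition} $(y_i, g_{i+1}) \mapsto y_{i+1}$ under next-token cross-entropy loss, since once the one-step rule is fit, autoregressive decoding produces the correct CoT trace by induction on the step index. First I would fix a training distribution over prompts of bounded length terminating in a placeholder $(y_0, g_1, y_1, \ldots, g_t, ?)$ with target $y_t$, and exhibit a target parameter configuration realizing the exact rule: the softmax attention places essentially all mass on the two tokens encoding the most recent state $y_{t-1}$ and the most recent action $g_t$, while the FFN, acting on the attention readout, implements the finite group-action table $(g_t, y_{t-1}) \mapsto g_t \cdot y_{t-1}$, which a sufficiently wide two-layer ReLU network can express exactly.

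Next I would analyze gradient descent in two coupled phases. In the \textbf{attention concentration phase}, I would track the softmax logits at every position of every training prompt and show that the population gradient systematically widens the gap between the logit at the ``relevant'' pair of tokens and all others. Because there is no positional encoding, positional symmetry must be broken through the content of the embeddings themselves: state tokens and action tokens live in distinct subspaces, and recency emerges from the templated interleaving of states and actions in the CoT. The main invariant to maintain across iterations is a growing signal-to-noise ratio between correct and incorrect attention logits, uniformly over the support of the data. In the \textbf{FFN fitting phase}, once attention has concentrated, the input to the FFN is (up to small error) a clean concatenation of the correct state and action embeddings, and a standard feature-learning or NTK-style argument drives the cross-entropy loss below any prescribed threshold by fitting the finite composition table. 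Unrolling the trained one-step predictor autoregressively and inducting on CoT length then yields the full state-tracking guarantee at the training length.

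The two group-action families, simply transitive and symmetry, are handled uniformly at this point: both define a deterministic one-step map on $(\text{state}, \text{action})$ pairs, so the same dynamics proof applies to each. Their structural distinction---the size of the stabilizers and hence the robustness of the step-wise retrieval---will only enter when extrapolating beyond the training length, and is therefore deferred to the subsequent length-generalization theorems. For the present statement it suffices that in both cases the one-step functional form lies in the transformer's realizable class and is reached by GD.

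The hard part will be the attention concentration phase. The softmax couples all logits, so the evolution of any single attention weight depends on every other weight at every layer of the recursion in $t$, and without positional encoding the symmetry-breaking must come entirely from the token content. Identifying the right monotone potential---some form of margin or signal-to-noise ratio between the target attention index and the rest---and showing that GD increases it at a uniform rate across the data distribution, is where the bulk of the technical effort will sit; the FFN fitting and the autoregressive induction are comparatively routine once attention is pinned.
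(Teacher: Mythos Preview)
Your high-level decomposition into an attention-learning phase and an FFN-learning phase matches the paper, but the \emph{order} is reversed, and that reversal hides a circularity. In your plan, attention concentrates first and then the FFN fits the composition table on the cleaned-up readout. But the gradient signal that drives attention toward the relevant pair $(y_{t-1},g_t)$ flows \emph{through} the FFN: if the FFN has not yet learned anything about the group action, the loss derivative does not distinguish the correct tokens from distractors, and there is no reason for the attention margin to grow. The paper resolves this chicken-and-egg problem with a curriculum: first train only the FFN on the length-$1$ task $\cT^1$, where the context contains exactly one predicate clause and one answer clause, so uniform attention (zero-initialized $\Qb$) already routes both relevant tokens to the FFN with no distractors present; then, with the FFN frozen at a configuration that correctly computes $g\cdot y$, train only the attention on $\cT^2$, where the working FFN supplies the gradient signal that tells attention which clauses matter. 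Your proposal is missing this curriculum, and without it the attention-first phase as you describe it is not obviously well-posed.

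A second point: you say the simply transitive and symmetry cases are ``handled uniformly'' at the one-step level. The paper's FFN analysis is actually structurally different in the two cases. In the simply transitive case each pair $(g,y)$ is learned by a single dedicated neuron and the learned features satisfy $V_{j,r}(g)\approx V_{j,r}(y)$. In the symmetry case an entire fiber $\{g:g(y)=j\}$ is captured by one neuron, and the convergence features are asymmetric, $V_{j,r}(g)\approx C_\alpha V_{j,r}(y)$ with $C_\alpha=\Theta(n_y)$. This asymmetry is not a cosmetic difference: it feeds directly into the subsequent attention dynamics and is what ultimately causes the weaker attention concentration (and hence weaker length generalization) in the symmetry case. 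So even for the present informal statement, the FFN analysis is not a black-box ``fit the table'' argument---it is a feature-learning analysis (tensor-power-method style, not NTK) whose output structure depends on the algebraic action, and that structure is what the attention phase consumes.
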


Since state-tracking problems for symmetry group are $\mathsf{NC}^1$-complete \cite{barrington1986bounded}, this result provides the first optimization-based training guarantee for solving problems beyond $\mathsf{TC}^0$,
matching the linear-CoT expressiveness result of~\cite{li2024chainthought} with
provable learnability under gradient-based training. 
In comparison, recent theoretical progress only showed that transformers can provably learn tasks in  $\mathsf{TC}^0$ such as parity 
\cite{kim2025transformersprovably,wen2025sparse,huang2025transformerslearn} and linear regression~\cite{huang2025transformers} with CoT, which, however, are solvable by log-precision, constant-depth transformers without CoT~\cite{li2024chainthought,merrill2024expressive}.

\paragraph{\underline{Result 2:} Algebraic structure dictates length generalization.} Going beyond training, our next result unveils the qualitative difference in length generalization induced by the difference in the group action structure. More importantly, we provide a mechanistic understanding of the length generalization property learned by gradient-based optimization.

\begin{figure}[!t]
    \centering
    \begin{subfigure}[t]{0.48\textwidth}
        \centering
\includegraphics[width=.85\linewidth]{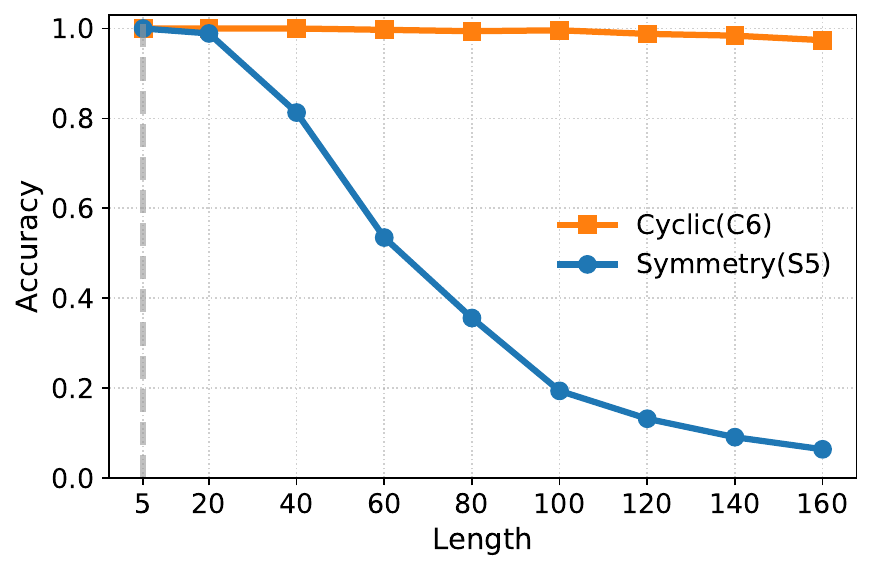}
        \caption{\centering Length generalization results of cyclic ($C_6$) vs. \\ symmetry ($S_5$) tasks.  }
        \label{fig:main-results-a}
    \end{subfigure}
    \hfill
    \begin{subfigure}[t]{0.48\textwidth}
        \centering
\includegraphics[width=.85\linewidth]{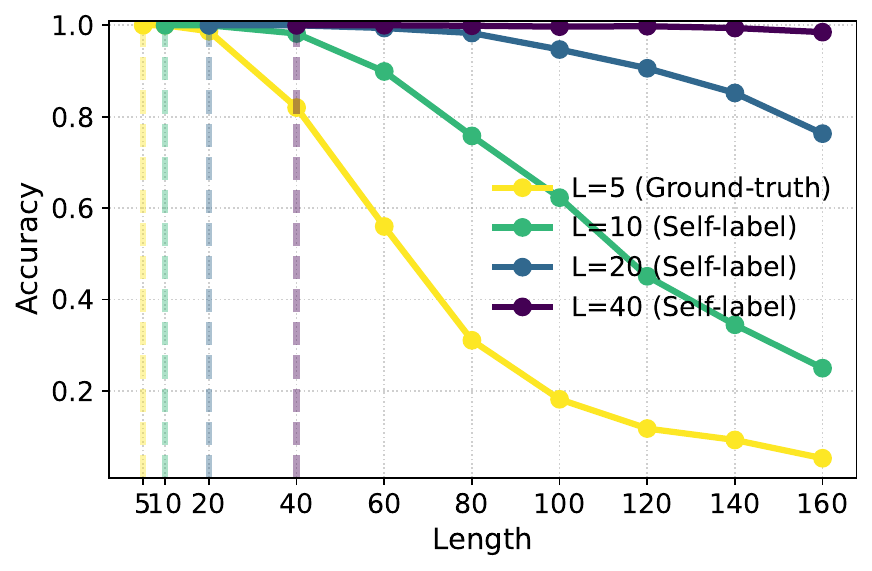}
        \caption{\centering Self-improvement results on symmetry tasks ($S_5$).}
        \label{fig:main-results-b}
    \end{subfigure}
    \caption{Empirical results of length generalization on LEGO tasks with different group actions. (a). Transformers length-generalize to solve significantly longer CoT tasks for simply transitive (cyclic) group (\Cref{thm:length-generalization}), while generalizing poorly for symmetry group tasks. (b). When direct length generalization falls short for symmetry actions, a recursive self-training scheme that train on the model's own longer CoT traces bootstraps the solvable problem length (\Cref{thm:length-gen-self-training}). The dashed lines indicate the training length.
    }
    \vspace{-0.35cm}
    \label{fig:main-results}
\end{figure}

\begin{theorem}[Length generalization, informal]\label{informal-thm-len-gen}
    The algebraic structure of the group actions in the state-tracking task dictates how far the reasoning length generalizes:
    \begin{itemize}[leftmargin=1.5em]
      \item \emph{For simply transitive actions:}
      standard CoT training on constant-length problems already yields generalization to problems of
      length \(d^{\,c^{*}}\), where \(0<c^{*}<1\) is a constant.
      \item \emph{For symmetry actions:}
      standard CoT training only generalizes to a constant factor of the training length. 
    \end{itemize}
    \end{theorem}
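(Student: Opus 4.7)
The plan is to lift the weight configuration produced at the end of training (from the first result) and analyze its behavior on inputs whose CoT length $L$ substantially exceeds the training length $L_{\mathrm{train}}$. The central object is the per-step attention distribution: at each CoT step $t$, the model must softly retrieve the most recent state token $y_{t-1}$, apply the action $g_t$ through the FFN, and emit $y_t$. Length generalization reduces to controlling how quickly the attention mass on the correct target leaks onto distractor tokens as their number grows.

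First, I would freeze the post-training parameters $W_Q W_K^{\top}$ and $W_V$, whose explicit structure comes from the analysis of the first result. Using this structure, I would derive closed-form expressions for the attention logit $\alpha_{\mathrm{corr}}$ on the correct retrieval target and the logits $\alpha_{\mathrm{dist}}$ on generic distractor positions, both written in terms of inner products of the state and action embeddings. The key quantity is the \emph{logit gap} $\Delta := \alpha_{\mathrm{corr}} - \max_{\mathrm{dist}} \alpha_{\mathrm{dist}}$, which is precisely the \emph{attention concentration} margin referenced in the theorem.

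Second, I would separately bound $\Delta$ in the two algebraic regimes. For a simply transitive action, distinct (state, action) composites index the group $\mathcal{G}$ bijectively, so their embeddings can be nearly mutually orthogonal in $\mathbb{R}^d$, producing a gap $\Delta_{\mathrm{trans}} = \Theta(\log d)$. For the canonical $S_n$ action on $\mathbb{Z}_n$, many group elements collapse onto the same state transition, so the gap $\Delta_{\mathrm{sym}} = \Theta(1)$ independent of $d$; I would derive this by pairing each correct retrieval key with a colliding sibling lying in the stabilizer coset and computing the residual inner product.

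Third, I would run a step-wise induction along the CoT trace. Assuming the first $t-1$ steps have been decoded correctly, the softmax puts total mass at most $t \cdot e^{-\Delta}$ on distractors, so the aggregated attention output differs from the pure retrieval of $y_{t-1}$ by $\mathcal{O}(t\, e^{-\Delta})$ in embedding distance. The FFN, whose decoding margin was established in the first result, then maps $(y_{t-1}, g_t)$ to $y_t$ correctly as long as this perturbation stays below that margin. Requiring $L\, e^{-\Delta} = o(1)$ and union-bounding over all $L$ steps yields the maximal tractable length, giving $L \lesssim d^{\,c^{*}}$ for some $0<c^{*}<1$ in the simply transitive case and $L \lesssim C \cdot L_{\mathrm{train}}$ in the symmetry case.

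The hardest part will be the symmetry regime: proving that $\Delta$ truly saturates at $\Theta(1)$ even after GD, rather than continuing to grow with $d$. This requires a lower-bound analysis showing that the training gradients from aliased group elements nearly cancel in expectation and therefore fail to separate the colliding keys in the learned representation. A secondary technical obstacle is uniformly controlling the softmax tail across all distractor configurations that can occur along an extrapolated trace; I would handle this via a concentration argument over the empirical distribution of state embeddings produced along the extended CoT, together with a careful accounting of how perturbation errors propagate from earlier CoT steps into the attention logits of later ones.
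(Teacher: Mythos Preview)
Your dilution arithmetic in step~3 ($L\cdot e^{-\Delta}$ leakage, union bound over steps) matches the paper's conversion of attention concentration into length generalization, and you correctly flag the symmetry regime as the crux. But your account of where the logit gap $\Delta$ comes from is wrong in a way that would block the argument. In the paper's architecture (Assumption~\ref{assump-Q-structure}, Figure~\ref{fig:Q-example}), attention routes by matching the \emph{variable} token $x_{\ell}$ in the query against variable tokens in context clauses: the only trainable blocks are $\Qb_{4,3},\Qb_{4,4}\in\RR^{d\times d}$, and the logits are $e_{\tau(x_\ell)}^\top\Qb_{4,p}\,e_{\tau(x_{\ell'})}$. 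The variable embeddings are orthonormal by assumption and the $x_\ell$ are sampled without replacement, so the keys \emph{never} collide---in either regime. The gap $\Delta$ is precisely the learned diagonal $[\Qb_{4,p}]_{s,s}$ versus the off-diagonals $\tilde O(1/d)$; it has nothing to do with ``inner products of state and action embeddings'' or ``colliding siblings in stabilizer cosets.''

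The real separation is in the \emph{training dynamics} of $\Qb$, coupled to the FFN feature structure fixed from stage~1. In the simply transitive case the FFN features are balanced, $V_{j,r}(g)\approx V_{j,r}(y)\approx B$, and the only way to push the loss below $1/\poly(d)$ is to drive $\epsilon_{\mathsf{attn}}$ to $O(d^{-c^*})$, which forces $[\Qb_{4,p}]_{s,s}=\Theta(\log d)$ (Stages~2.1--2.3). In the symmetry case the FFN features are imbalanced, $V_{j,r}(g)\approx n_y\cdot V_{j,r}(y)$ (since each fiber has $(n_y-1)!$ elements), and the distractor events---e.g.\ $g_1(y_0)=g_2(y_0)$ with probability $\Theta(1/n_y)$---mean the loss already drops to $1/\poly(d)$ with $\epsilon_{\mathsf{attn}}$ at a small constant; GD stalls with $[\Qb_{4,p}]_{s,s}=\Theta(1)$. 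Your ``gradient cancellation from aliased group elements'' gestures at this, but the mechanism is loss-threshold saturation driven by the FFN's imbalance and the distractor probability structure, not embedding-level collisions.

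You also miss a second technical axis: there are \emph{two} correct retrieval targets (the predicate clause carrying $g_{\ell+1}$ and the answer clause carrying $y_\ell$), and correctness requires keeping their attention masses balanced---the paper's $\Delta^{L,\ell}$ in~\eqref{eq:attention-gap}. Controlling this balance through training, and showing it stays small under the imbalanced FFN in the symmetry case, is a substantial portion of the work (the feedback arguments in \S\ref{sec-overview-symmetry}); your single-target-vs-distractor framing would need to be extended to handle it.
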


This provides a rigorous theory of length extrapolation for state-tracking: it characterizes when and how length generalization emerges for different reasoning problems. Formal statements are deferred to \Cref{sec:cyclic} (simply transitive actions) and \Cref{sec:symmetry} (symmetry group actions).

\vspace{-0.15cm}
\begin{inlineheadbox}{}
    Our theory uncovers the \textbf{mechanism of attention concentration} that explains the varying degrees of length generalization. At a high level, the performance of long-context reasoning depends on the level at which the attention layer focuses on task-relevant contexts. The algebraic structure of the group action dictates the level of attention concentration at convergence: simply transitive actions enable sharp concentration and strong length generalization, whereas symmetry actions introduce distractors that dilute attention focus, limiting length generalization to constant factors.
\end{inlineheadbox}

\paragraph{\underline{Result 3:} Self-training extends solvable problem length.} Although length generalization is limited for symmetry actions, recent empirical studies~\cite{singh2024human,gulcehre2023reinforced,lee2025selfimproving} have shown that neural networks can bootstrap their capability via \emph{self-improvement}. In this work, we prove that a one-layer transformer obtained via a self-training scheme similar to \cite{lee2025selfimproving}, can extend its solvable problem length. To the best of our knowledge, this offers the first {optimization} guarantee of self-improvement for transformer networks.

\begin{theorem}[Recursive self-improvement, informal]\label{informal-self-training}
A one-layer transformer, trained with a recursive self-training scheme, can self-improve: at each stage \(k> 1\), the model learning on the self-labeled reasoning traces of length \(2^k\), length generalizes to solve the next doubled length \(2^{k+1}\).
Consequently, after $\Theta(\log d)$ stages, the model can solve problems of length $d$,
the maximal length in our setting.
\end{theorem}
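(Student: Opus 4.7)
The plan is to prove the theorem by induction on the self-training stage $k$, using the symmetry case of \Cref{informal-thm-len-gen} as the workhorse at every round. The base case ($k=1$) is exactly that theorem: standard CoT training at a constant initial length yields a model that length-generalizes by a constant factor $C \geq 2$, which suffices to solve length $2^{2}$ after a harmless rescaling of the initial length. For the inductive step, I would assume that after stage $k$ the trained transformer $\mathrm{TF}_k$ correctly produces CoT traces of every length up to $2^{k}$; sampling fresh inputs of length $2^{k}$ and letting $\mathrm{TF}_k$ generate their CoTs then yields a corpus of \emph{correctly labeled} self-training data. Feeding this corpus into the same GD procedure analyzed for standard training---now with the training length set to $L_k := 2^{k}$---and re-invoking \Cref{informal-thm-len-gen} would yield a new model $\mathrm{TF}_{k+1}$ that solves length $C L_k \geq 2^{k+1}$, closing the induction. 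Iterating this doubling from constant initial length reaches length $d$ in $\Theta(\log d)$ stages, which is the desired count.

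The crux of the argument is to verify that the convergence analysis underlying the symmetry case of \Cref{informal-thm-len-gen} continues to apply at every stage, not merely the first. The distributional assumptions on the training data---balanced group elements and a uniform initial state---are preserved by i.i.d.\ resampling at each stage, so the only non-trivial check is that the attention-concentration mechanism still kicks in on the longer inputs. Here my argument would rely on the stage-$1$ analysis being essentially \emph{length-invariant}: the attention logits at convergence concentrate on the task-relevant positions because of the algebraic structure (stabilizer size and distractor count), not the absolute CoT length. Consequently, I expect all quantitative bounds to transfer from $L_0$ to $L_k$ provided the sample size at each stage scales appropriately, and the retrieval-robustness guarantee driving the constant-factor extrapolation simply re-applies to inputs of length $L_k$.

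A secondary but delicate issue is that the self-labeled CoTs at stage $k$ are not \emph{exactly} correct: some small per-step error $\varepsilon_k$ is inherited from the trained model, and over $\Theta(\log d)$ rounds this error could in principle compound and eventually corrupt the labels used at later stages. I would handle this by picking the sample size $n_k$ at stage $k$ large enough that $L_k \varepsilon_k$ stays below a stage-independent tolerance, and then taking a union bound across the $\Theta(\log d)$ stages and the $d$ positions to ensure that the entire scheme succeeds with high probability while keeping the total sample complexity polynomial in $d$. Balancing this error budget across rounds---so that the cumulative label noise never overwhelms the attention margin inherited from the stage-$1$ convergence analysis---is the step I expect to consume the bulk of the technical work.
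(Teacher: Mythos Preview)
Your induction on $k$ and the intuition that attention concentration is the quantity to track are both right. The gap is in how you close each inductive step: you propose to ``re-invoke \Cref{informal-thm-len-gen}'' by running ``the same GD procedure'' on the self-labeled length-$2^k$ corpus, implicitly retraining from scratch. The paper does not do this, and your plan would not go through as written. In Algorithm~\ref{alg:cot-symmetry-training}, the FFN weights $\Wb$ are trained \emph{once} (on $\cT^1$) and then frozen; every stage $k\geq 2$ updates only $\Qb$, \emph{continuing} from the model $F^{(T_{k-1})}$ rather than re-initializing. The stage-$1$ analysis you want to black-box relies on the specific initialization of \Cref{assump:init} (uniform attention at $t=0$, Gaussian FFN weights driving the tensor-power-method competition among neurons), and these conditions are false at the start of stage $k$. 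So the from-scratch convergence result cannot simply be re-applied; a genuinely different analysis is needed for the warm-started dynamics.

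What the paper maintains instead is a concrete inductive invariant on the attention layer: at the end of each stage $k$, the concentration error $\epsilon^{L_k,2}_{\mathsf{attn}}$ and the gap $\Delta^{L_k,2}$ are both below a fixed small constant independent of $k$. Doubling the context to $L_{k+1}=2L_k$ at most doubles the number of distracting clauses, so $\epsilon^{L_{k+1},2}_{\mathsf{attn}}\leq 2\,\epsilon^{L_k,2}_{\mathsf{attn}}$ at the start of stage $k{+}1$---still small---and a separate dynamics analysis for the warm-started $\Qb$ (\Cref{induction-s22-rec}, \Cref{lem-end-rec}) shows that continued training on the self-labeled loss drives it back down, restoring the invariant. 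A direct consequence is that the greedy annotator $\widehat p_{F^{(T_{k-1})}}$ outputs \emph{exactly} the ground-truth labels on doubled-length inputs (\Cref{remark-fact-1-rec}), so the self-training loss coincides with the true loss and there is no label noise to accumulate; your error-budgeting across $\Theta(\log d)$ rounds is unnecessary.
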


\begin{figure}[!t]
    \centering
    \begin{subfigure}[t]{0.45\textwidth}
        \centering
\includegraphics[width=0.9\linewidth]{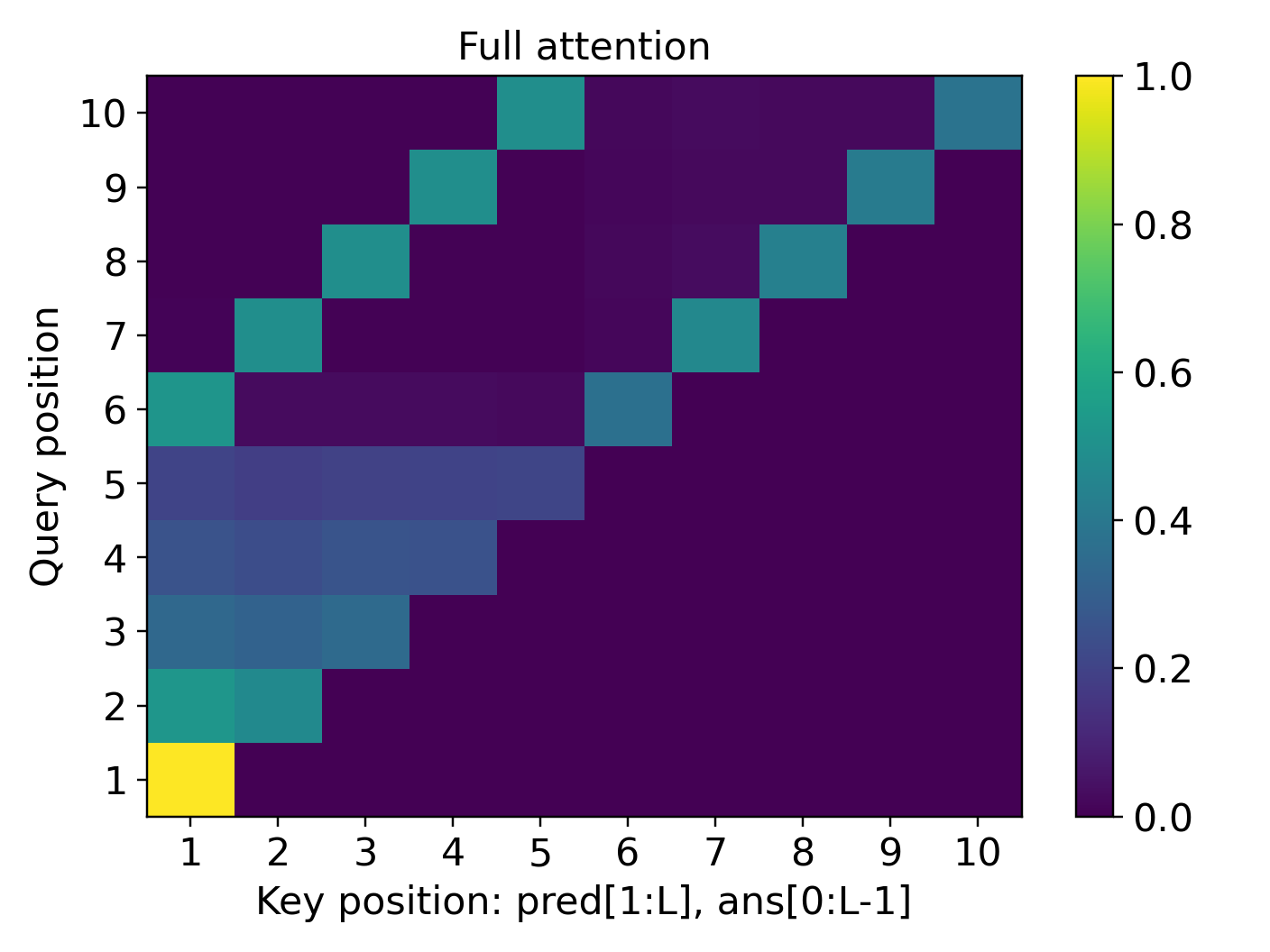}
        \caption{\centering  Cyclic ($C_6$) task }
        \label{fig:attn-con-1}
    \end{subfigure}
    \hfill
    \begin{subfigure}[t]{0.45\textwidth}
        \centering
\includegraphics[width=0.9\linewidth]{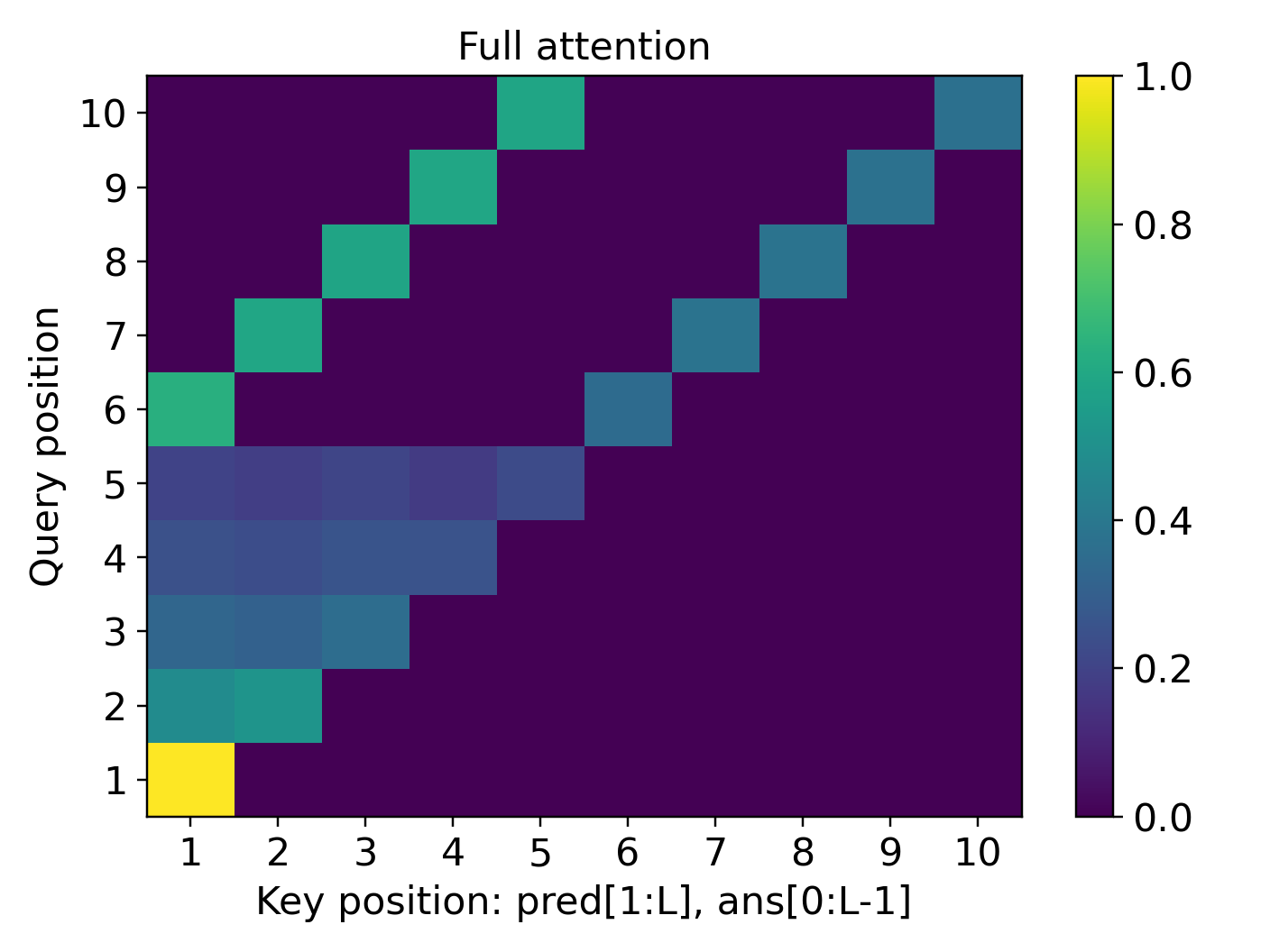}
        \caption{\centering Symmetry ($S_5$) task}
        \label{fig:attn-con-2}
    \end{subfigure}
    \caption{Attention concentration at convergence for LEGO task with length $L=5$. The heatmap places the query clause index on the $y$-axis (keys on the $x$-axis). For a task of length $L$, the LEGO sequence prior to the final
answer clause has length \(2L\);  we focus on query positions $L+1$ to $2L$, corresponding to answer clauses $Z_{\ans,0}$ to $Z_{\ans,L-1}$. Two diagonal bands in the upper region indicate attention concentrating on the answer clause $Z_{\ans,\ell}$ and the predicate clause $Z_{\pred,\ell+1}$ when
    the query is $Z_{\ans,\ell}$.
    }
    \label{fig:attn-con}
\end{figure}

Our work complements recent advances on length generalization in transformers. Prior studies~\cite{marsden2024provable, ahuja2024provable, huang2024formal, golowich2025role, joshi2025theory} primarily established statistical guarantees and \emph{non–gradient-based} learnability.
Notably, \cite{golowich2025role} identifies conditions under which specialized positional encodings and sparse contextual dependencies enable length extrapolation, provided the model fits the source length.
\cite{joshi2025theory} analyzed time-invariant autoregressive models (a fixed
next-token generator) and showed that the sample complexity can be independent of the CoT length but on the complexity of individual CoT steps. We provide a concrete setting that mirrors these works and, crucially,
show that gradient-based optimization \emph{actually finds} such solutions for transformer networks.

\subsection{Our Empirical Results at a Glance}
Empirically, we conduct a series of experiments aligned with our problem setup to test the
theory’s predictions, and observe strong agreements with our predicted behavior. In particular:
\begin{itemize}[leftmargin=1.5em]
  \item \textit{Length generalization comparison.}
  \Cref{fig:main-results-a} shows that training at a short length yields nearly perfect accuracy at much longer lengths for tasks with simply transitive actions, whereas
  tasks with symmetry actions exhibit only constant-factor length generalization. These findings are consistent with  \Cref{informal-thm-len-gen}.

  \item \textit{Effectiveness of recursive self-training.}
  In \Cref{fig:main-results-b}, a double-and-self-labeled curriculum for the
  task with symmetry actions consistently shifts the length–accuracy curve rightward; after
  several stages, performance matches the simply transitive case, validating
  \Cref{informal-self-training}.

  \item \textit{Attention concentration patterns.}
  At convergence, the attention heatmaps in \Cref{fig:attn-con} exhibit two clear
  diagonals with concentrated mass, confirming the attention-concentration mechanism
  predicted by our theory.
\end{itemize}
Detailed descriptions and discussions, along with additional experiments, are postponed to \Cref{sec:exp}.

\section{Background}\label{sec:problem}

Computational complexity \cite{arora2006computational} has been employed to characterize the power of neural networks \cite{godbeer1987computational,siegelmann1992computational}. 
Historically, \emph{circuit complexity} has been used extensively to study the power of neural networks \cite{shawe1992classes,parberry1994circuit,maass1996lower,vsima2003general,merrill2025exact,merrill2023parallelism,wang2025learning}, due to the structural resemblance of Boolean circuits and neural networks with threshold gates. In a nutshell, circuit complexity evaluates computation models by the size, depth, and gate types of Boolean circuits that implement them. Below we provide a brief introduction of the basics of circuit complexity.

\subsection{Circuits and Expressiveness}
A Boolean circuit is a finite acyclic network of logic gates that computes a
Boolean function on $\{0,1\}^n\to \{0,1\}$ for some fixed $n$. The gates with fan-in $0$ are the inputs, which are assigned one of the $n$ Boolean variables.  A \emph{circuit family} $\{C_n\}_{n\ge 1}$ computes a language by using $C_n$
on length-$n$ inputs. We measure the  \emph{size} of a circuit by the number of gates, and \emph{depth}  by the length of the
longest input-output path, respectively. For instance,  ``constant depth'' means $\operatorname{depth}(C_n)=O(1)$, while  ``polynomial size'' means $\operatorname{size}(C_n)=O(n^c)$ for some integer $c$. Standard circuit complexity classes are defined by restricting circuit depth, size, gate set and fan-in:
\begin{itemize}[itemsep=0pt,leftmargin=1.5em]
\item $\mathsf{TC}^0$ consists of \emph{constant-depth}, polynomial-size circuits with
unbounded fan-in $\{\mathsf{AND},\mathsf{OR},\mathsf{NOT}\}$ gates augmented with
threshold (e.g., $\mathsf{MAJORITY}$) gates. The circuit class $\mathsf{TC}^0$ captures exactly the complexity of integer multiplication and division, and sorting \cite{chandra1984constant}.
\item 
$\mathsf{NC}^1$ consists of \emph{log-depth}, polynomial-size circuits over
$\{\mathsf{AND},\mathsf{OR},\mathsf{NOT}\}$ with bounded fan-in. This class captures exactly the complexity of recognizing all regular languages.
\item $\mathsf{P}/\mathsf{poly}$ is the class of all languages computable by polynomial-size circuit families.
\end{itemize}
These classes satisfy the following relations:\footnote{
We intentionally omit uniformity conventions here as this section is only meant to
give readers a sense of the classes we refer to and their standard relationships.}
\[
\mathsf{TC}^0 \subseteq \mathsf{NC}^1 
\subseteq \mathsf{P}/\mathsf{poly}.
\]

\paragraph{Expressiveness gap of transformers via circuit complexity.} One can analyze the
expressivity of transformer neural nets through the lens of circuit complexity.  Prior work~\cite{merrill2022saturated,liu2023transformers, merrill2023parallelism, feng2023revealing,li2024chainthought} showed that the expressive upper bound of a vanilla constant-depth transformer is limited to $\mathsf{TC}^0$, the class of problems solvable by extremely shallow, highly parallel circuits.  In contrast, when equipped with CoT, recent work \cite{feng2023revealing, merrill2024expressive, li2024chainthought} proved that \(O(n)\) intermediate steps enable transformers to simulate $\mathsf{NC}^1$-complete language. Further, \cite{li2024chainthought} showed that $2$-layer transformers with polynomially many
CoT steps suffice to express arbitrary polynomial-size circuits, i.e., $\mathsf{P}/\mathsf{poly}$. Hence, under the widely believed conjecture that $\mathsf{TC}^0\neq \mathsf{NC}^1$ \cite{vollmer1999introduction}, CoT strictly extends the expressive power of transformers beyond $\mathsf{TC}^0$, allowing them to solve problems that inherently require super-constant (here \(O(\log n)\)) computation depth.  A standard example connecting algebraic structure with circuit complexity is the following \emph{word problem}, define by Dehn \cite{dehn1911unendliche, dehn1987infinite}.

\begin{definition}[Word problem for a group $G$]
Let $G$ be a group and let $e$ denote its identity.
For a word $w = g_1 \cdots g_k \in G^*$ (each $g_i \in G$), 
the word problem asks to decide whether $g_1 \circ \cdots \circ g_k=e$.

\end{definition}

\noindent Barrington~\cite{barrington1986bounded} proved that the above word problem is $\mathsf{NC}^1$-complete when the group is non-solvable.
\begin{theorem}[Barrington~\cite{barrington1986bounded}]
    The word problem of every finite non-solvable group is $\mathsf{NC}^1$-complete.  The canonical example is $S_n$ for $n\geq 5$, the symmetry group on $n$ elements that encodes the permutations. 
\end{theorem}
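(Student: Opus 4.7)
The plan is to establish the two directions of the completeness claim separately. For containment in $\mathsf{NC}^1$, the product $g_1 \circ \cdots \circ g_k$ in a fixed finite group $G$ is evaluated by a balanced binary tree of $O(k)$ multiplications, each a constant-time lookup in the finite Cayley table; this yields a log-depth, polynomial-size circuit, and a final $O(1)$ comparison with $e$ decides membership. Thus the word problem of any finite $G$ lies in $\mathsf{NC}^1$, irrespective of solvability.

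The nontrivial direction is $\mathsf{NC}^1$-hardness, which I would prove via Barrington's construction using bounded-width permutation branching programs over $G$. Given an $\mathsf{NC}^1$ circuit of depth $d = O(\log n)$ over the basis $\{\wedge,\neg\}$ with fan-in two, the goal is to produce a word $w \in G^*$ whose letters are either fixed elements of $G$ or elements of $G$ selected by an input bit $x_i$, such that $w$ evaluates to a designated nonidentity $\sigma \in G$ when the circuit outputs $1$ and to $e$ otherwise. Call such a program a $\sigma$-computation of the corresponding Boolean function $f$. The induction uses three operations: (i) every literal $x_i$ or $\neg x_i$ is $\sigma$-computed by a length-$1$ program, for any choice of $\sigma \neq e$; (ii) a $\sigma$-computation of $f$ becomes a $\sigma^{-1}$-computation of $\neg f$ by prepending the constant letter $\sigma^{-1}$; (iii) if $P_1$ is an $\alpha$-computation of $f_1$ and $P_2$ a $\beta$-computation of $f_2$, then the concatenation $P_1 P_2 P_1^{-1} P_2^{-1}$ is a $[\alpha,\beta]$-computation of $f_1 \wedge f_2$, since the product telescopes to the commutator $[\alpha,\beta]$ only when $f_1=f_2=1$ and collapses to $e$ in every other case. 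Conjugating by any fixed $g \in G$ converts a $\sigma$-computation into a $g\sigma g^{-1}$-computation with length cost $O(1)$, so outputs can be freely moved within a single conjugacy class.

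Each AND gate at most quadruples the program length while NOT gates and conjugations add only $O(1)$ letters, so a depth-$d$ circuit is simulated by a program of length $4^{d} = \mathrm{poly}(n)$. Appending $\sigma^{-1}$ turns the condition ``the product equals $\sigma$'' into ``the product equals $e$'', giving an $\mathsf{NC}^1$ reduction from an arbitrary $\mathsf{NC}^1$ language to the word problem of $G$; combined with the upper bound, this yields $\mathsf{NC}^1$-completeness.

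The main obstacle, and the place where non-solvability enters essentially, is supplying the algebraic seed for the recursion: one needs a nontrivial conjugacy class $C \subseteq G$ together with $\alpha,\beta \in C$ whose commutator $[\alpha,\beta]$ again lies in $C$, so that the induction stays inside $C$ and always produces a nontrivial output witnessing circuit acceptance. If $G$ were solvable, the derived series $G \supseteq G' \supseteq G'' \supseteq \cdots$ would reach $\{e\}$ and any iterative commutator scheme would eventually collapse to the identity; conversely, in a non-solvable $G$ the derived series stabilizes at a nontrivial perfect subgroup, from which the required class is extracted. For the canonical example $G=S_5$, one verifies by direct calculation that there exist two $5$-cycles whose commutator is again a $5$-cycle, so the class of $5$-cycles serves as $C$ throughout the induction. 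With this algebraic lemma in hand, the inductive construction completes the proof.
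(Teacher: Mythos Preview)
The paper does not provide its own proof of this theorem; it is stated in the background section as a cited result from Barrington's original paper, used to motivate why state-tracking over non-solvable groups like $S_5$ yields an $\mathsf{NC}^1$-complete problem. Your proposal is a correct and standard outline of Barrington's original argument---the balanced-tree upper bound, the commutator simulation of AND, and the use of a conjugacy class of $5$-cycles closed under commutators in $S_5$---so there is nothing to compare against in the paper itself.
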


The word problem captures a fundamental reasoning task: state tracking in a finite world~\cite{merrill2024illusion,merrill2025little,Li2025HowDL}. Given an initial state and a sequence of transformations, the goal of the task is to compute the resulting state. State tracking underlies practical LLM abilities such as narrative entity tracking, chess move analysis, and code execution \cite{kim2023entity,merrill2024illusion}, while exhibiting a rich connection to the circuit complexity theory. Therefore, it has become a standard synthetic testbed for probing the reasoning abilities of language models, both theoretically~\cite{liu2023transformers, merrill2024illusion,merrill2025little} and empirically~\cite{zhang2022unveiling,liu2023transformers,Li2025HowDL}. Motivated by this connection, we study the state-tracking task with varying algebraic structures.

\subsection{LEGO for State Tracking}
We focus on a specific formulation of the state-tracking problem, LEGO (\emph{Learning Equality and Group Operations}) \cite{zhang2022unveiling}, which was originally proposed as a synthetic task to study the reasoning behavior of  transformers empirically. A typical LEGO instance in~\cite{zhang2022unveiling} takes the following form: 
\begin{equation*}
    \texttt{b = + a}, \quad \texttt{c = - b}, \quad \dots, \quad \texttt{t = - s},\quad \texttt{a = -1}, \quad \dots
\end{equation*}
Here, \texttt{a, b, c, \dots} are \textbf{variables}, each taking a \textbf{value} (or state) in $\{-1,+1\}$ in this example. Short expressions such as \texttt{b = + a} are \textbf{clauses}, where \texttt{= +} and \texttt{= -} denote \textbf{actions}: the action is applied to the \emph{right-hand-side} variable’s value to obtain the \emph{left-hand-side} variable’s value. For instance, from \texttt{b = + a} and \texttt{a = -1}, it follows that \texttt{b = -1}. Formally, the LEGO language is defined as follows:
 \begin{definition}[LEGO language~\cite{zhang2022unveiling}]\label{def:lego}
    Let $\cX,\cG,\cY$ be finite sets of variables, actions, and values, respectively, where each $g\in\cG$ is a map $g:\cY\to\cY$. The formal language $\lego(\cX,\cG,\cY)$ has alphabet $\cX\cup\cG\cup\cY\cup\{=,(,)\}$ and consists of two types of expressions (called \textbf{clauses}):
    \begin{enumerate}[label=(\arabic*), leftmargin=3em]
        \item \emph{Predicate clause} $x = g(x')$ specifies an action $g\in\cG$ linking variables $x,x'\in\cX$.
        \item \emph{Answer clause} $x = y$ assigns a value $y\in\cY$ to a variable $x\in\cX$.
    \end{enumerate}
            A canonical LEGO sentence of length $L$ with answer up to $L'$ concatenates predicate clauses $x_n = g_n(x_{n-1})$ for $n\in[L]$ and answer clauses $x_n = y_n$ for $n\in[L']$ with $L'\le L$:
    \begin{equation}\label{eqdef:lego-sentence-form}
        \underbrace{x_1 = g_1(x_0)\ \dots\dots\ x_L = g_L(x_{L-1})}_{\text{predicates}}\ \underbrace{x_0 = y_0\ \dots\dots\ x_{L'} = y_{L'}}_{\text{answers}},
    \end{equation}
    which describes the chain of transitions:
    \begin{equation}
        \underbrace{x_0 \xrightarrow{g_1} x_1 \xrightarrow{g_2} x_2 \xrightarrow{g_3} \cdots \xrightarrow{g_{L'}} x_{L'}}_{ \text{with answers $y_1,\dots,y_{L'}$ up to } L'} \cdots \xrightarrow{g_{L}} x_{L},\quad \text{starting with } x_0 = y_0. \notag
    \end{equation}
For semantic validity, any sentence containing a path $x_n = g_n(x_{n-1}), \dots, x_{n-k+1} = g_{n-k+1}(x_{n-k})$ for $k\in[n]$ must satisfy: 
$y_n \;=\; g_n\circ g_{n-1}\circ\cdots\circ g_{n-k+1}(y_{n-k}).$ 
    \end{definition}
    In the LEGO language, predicate clauses encode transformations, while answer clauses encode observed states. Thus, solving a LEGO sentence is exactly state tracking: compose the listed
actions along the path and propagate the observed states to predict the next
answer consistent with the composition.  
\section{Problem Formulation}\label{sec:setup}

As introduced in Section~\ref{sec:problem}, the current paper employs the LEGO framework to investigate the reasoning
capabilities of transformers. To set the stage, this section presents precise mathematical formulations of the problems to be studied in this paper. Before proceeding, we introduce the following notation: 
\begin{itemize}[itemsep=0pt,leftmargin=1.5em]
\item {\em Vocabulary.} Define the {vocabulary} as $\cV \coloneqq \cX \cup \cG \cup \cY \cup \{\blank\}$, where the \emph{blank} token $\blank$ is a null symbol indicating the absence of other tokens.
\item {\em Vocabulary size.} Let $d \coloneqq |\cV|$ denote the (finite) vocabulary size.
\end{itemize}
To facilitate theoretical analysis, we concentrate on the asymptotic regime where $d\to\infty$. Note that \(|\cX|\), \(|\cG|\), \(|\cY|\) may depend on
\(d\), and we shall specify any required scaling assumptions as needed.

\begin{assumption}[Asymptotic regime]\label{assump:asymptotic-regime}
    For a language $\lego(\cX,\cG,\cY)$ defined in \Cref{def:lego}, we consider the asymptotic regime where both the vocabulary size $d$ and the number of variables $|\cX|$ tend to infinity. Assume $|\cG| \le \log^{C_0} d$ for some constant $C_0 \in [1,100)$, and hence $\cY$ and $\cG$ are much smaller than $\cX$ in size.
\end{assumption}
\subsection{Data Distribution}
We begin by specifying how LEGO clauses are tokenized, followed by a definition of the LEGO distribution.
\begin{definition}[LEGO encoding]\label{def:lego-encoding}
Each LEGO clause from \Cref{def:lego} is encoded as a fixed-length, 5-token tuple
$Z \in \cV^5$. More specifically, 
\begin{itemize}[itemsep=0pt,leftmargin=1.5em]
    \item For each predicate clause $x = g(x')$, set
    $Z_{\pred} \coloneqq (x,\, g,\, x',\, \blank,\, \blank) \in \cV^5$;
    \item For each answer clause $x = y$, set
    $Z_{\ans} \coloneqq (\blank,\, \blank,\, \blank,\, x,\, y) \in \cV^5$.
\end{itemize}
\end{definition}

With \Cref{def:lego-encoding} in place, we can encode a LEGO sentence \eqref{eqdef:lego-sentence-form} to a sequence \(Z^{L, L'}\):
\begin{equation}\label{eqdef:lego-sequence-encoding}
    Z^{L, L'} = (Z_{\pred,1},\dots,Z_{\pred,L},Z_{\ans,0},\dots,Z_{\ans,L'}),
\end{equation}
where  \(Z_{\pred,k}\) and \(Z_{\ans,k}\) represent the $k$-th predicate and answer clauses, respectively. For convenience, we denote \(\cI^{L,L'} = \{(\pred,\ell)\}_{\ell\in[L]} \cup \{(\ans,\ell)\}^{L'}_{\ell=0}\) as the index set associated with the clauses in \(Z^{L,L'}\). If \(L = L'\), we  write the sequence simply as \(Z^{L}\) and the corresponding index set as \(\cI^L\).

To feed LEGO tokens into a neural network, we first map each symbol to an
integer index (\emph{tokenization}), and then map indices to continuous
vectors via a learned table (\emph{embedding}).
The following definitions formalize these two steps.

\begin{definition}[Tokenization and token embedding]\label{def:token-embedding}
    Each $v \in \cV$ is assigned a unique index
    $\tau(v) \in \{1,\ldots,d\}$.
    Denote by $e_i \in \mathbb{R}^d$ the embedding vector associated with
    index $i$, and write $e_v \equiv e_{\tau(v)}$ for convenience.
    The blank token is assigned the zero vector,
    $e_{\tau(\blank)} = \mathbf{0}_d \in \mathbb{R}^d$.
    For technical simplicity, we assume that
    $\{\,e_v : v \in \cV \setminus \{\blank\}\,\}$ forms an orthonormal set
    in $\mathbb{R}^d$ (note that this assumption can be relaxed to a well-conditioned
    embedding matrix without affecting our results).
    \end{definition}

Equipped with \Cref{def:token-embedding}, we can transform the LEGO sequence encoding into vector embeddings that can be used as inputs to neural networks.

\begin{definition}[Embedding of LEGO sentences]\label{def:clause-representation}
    Let \(d_c := 5d\) be the clause embedding dimension. We define an operator \(\mathsf{Embed}: \cV^5 \to \RR^{d_c}\) that maps a clause to embedding by 
    \begin{align*}
        \Zb = \mathsf{Embed}(Z) \coloneqq (e_{v_1}, e_{v_2},\dots, e_{v_5}) \in \RR^{d_c},\quad \textrm{for clause } Z = (v_1, v_2,\dots, v_5) \in \cV^5.
    \end{align*}
    Specifically, a LEGO sentence $Z^{L,L'}$ defined in \eqref{eqdef:lego-sequence-encoding} is embedded as
\begin{equation*}
    \Zb^{L,L'} = (\Zb_{\pred,1},\dots,\Zb_{\pred,L},\Zb_{\ans,0},\dots,\Zb_{\ans,L'}) \in \RR^{d_c \times (L + L' + 1)}
\end{equation*} 
where each column $\Zb_{\pred,\ell}\in\RR^{d_c}$ (resp.~$\Zb_{\ans,\ell}$) is the embedding of clause $Z_{\pred,\ell}$ (resp.~$Z_{\ans,\ell}$). When \(L' = L\), we simply write \(\Zb^L \equiv \Zb^{L,L'}\) for simplicity.
\end{definition}

Next, we describe the distribution that governs the generation of a LEGO sentence.
\begin{assumption}[LEGO distribution \( \cD^L, \cD^{L,L'}\)]\label{assump:lego-data-distribution}
    Consider \(\lego(\cX,\cG,\cY)\) as defined in \Cref{def:lego}, and let \(L\) denote the sequence length. We assume that the distribution \(\cD^{L}\) of length-$L$ LEGO sentences satisfies the following properties.
    \begin{enumerate}[itemindent=0pt,leftmargin=1.5em]
        \item All LEGO sentences \(Z^{L} \sim \cD^{L}\) of the form \eqref{eqdef:lego-sentence-form} are encoded by \Cref{def:lego-encoding} into representation \eqref{eqdef:lego-sequence-encoding}.
        \item The variables \(x_0, x_1, \dots, x_L \in \cX\) are sampled uniformly at random from \(\cX\) without replacement.
        \item The first value \(y_0 \in \cY\) is chosen uniformly at random from \(\cY\).
        \item The actions \(g_1, g_2,\dots, g_L \in \cG\) are sampled uniformly at random from \(\cG\) with replacement.
        \item The intermediate values \(y_1,y_2,\dots, y_{L}\) are computed recursively by \(y_{i} = g_i(y_{i-1})\).
    \end{enumerate}
    For any \(L' < L\), we define the truncated distribution \(\cD^{L,L'}\) of sequences \(Z^{L,L'}\)  containing all the predicates and the first \(L'+1\) many answer clauses, where \(Z^{L,L'}\) is obtained by first sampling \(Z^{L}\sim\cD^L\) and then removing the answer clauses \(Z_{\ans,\ell}\) for all \( \ell > L'\).
\end{assumption}

As can be easily seen, sequences sampled from \(\cD^L\) or \(\cD^{L,L'}\) correspond to valid LEGO sentences as defined in \Cref{def:lego}. With a slight abuse of notation, we write \(\Zb^{L,L'} \sim \cD^{L,L'} \)  to indicate that  \(\Zb^{L,L'}\) is the embedding of a sentence \(Z^{L,L'}\) drawn from \(\cD^{L,L'}\).

\subsection{Transformer Architecture}\label{sec:network-architecture}

In this subsection, we introduce the transformer architecture investigated in this paper. Towards this end, 
we first introduce a smoothed activation function that will be used in
our network.
\newcommand{\srelu}{\mathbf{sReLU}}
\begin{definition}[Smooth ReLU]\label{def:smooth-relu}
    Define a continuously differentiable variant of the ReLU activation function~\cite{allen2020towards,huang2022modality} as follows:
    \[
      \srelu(x)
      \;:=\;
      \begin{cases}
        \displaystyle \frac{\varrho}{q},
          & x \le -\varrho,\\[6pt]
        \displaystyle \frac{x^{q}}{\varrho^{\,q-1} q},
          & x \in (-\varrho,\varrho],\\[8pt]
        x - \varrho\!\left(1 - \frac{1}{q}\right),
          & x > \varrho,
      \end{cases}
    \]
    where $q$ and $\varrho$ are some design parameters. Here and throughout, we choose $q=O(1)$ to be a large, even integer,  and take 
$\varrho=\Theta\!\bigl(1/\polylog(d)\bigr)$.
    \end{definition}

\paragraph{Transformer architecture.} 
In this work, we focus on an autoregressive transformer~\cite{Vaswani2017AttentionIA} whose block
consists of a softmax attention layer followed by a position-wise feed-forward network, as described below. 

\begin{itemize}[itemsep=0pt,leftmargin=1.5em]
\item 
{\em Attention layer.}
Given LEGO sentence embeddings \(\Zb^{L,L'}\) and indices \(\bj,\bk \in \cI^{L,L'}\), the attention from clause \(\Zb_{\bj}\) to clause \(\Zb_{\bk}\) is defined through the softmax operator as
\[
\attn_{\bj \to \bk}(\Qb,\Zb^{L,L'}) \;\coloneqq \;
\frac{\exp(\Zb_{\bj}^{\top}\Qb\,\Zb_{\bk})}
{\sum\nolimits_{\br \in \cI^{L,L'}} \exp(\Zb_{\bj}^{\top}\Qb\,\Zb_{\br})}.
\]
Since the model is autoregressive, a standard causal mask is applied to ensure that the latest (answer) token attends only to \emph{preceding} tokens (including itself). The attention output is
\[
\mathrm{Attention}(\Qb,\Zb^{L,L'})
\;\coloneqq\;
\sum\nolimits_{\bk \in \cI^{L,L'}} \attn_{\ans,L' \to \bk}(\Qb,\Zb^{L,L'}) \cdot \Zb_{\bk}.
\]
Note that in the standard formulation of transformers, the score takes the form
\(\Zb_{\bj}^{\top}\Wb^{Q\top}\Wb^{K}\Zb_{\bk}\) instead for a ``query'' parameter matrix $\Wb^Q$ and a ``key'' parameter matrix $\Wb^K$.
Here, we fold \(\Wb^{Q\top}\Wb^{K}\) into a single  parameter matrix \(\Qb\), resulting in a score
\(\Zb_{\bj}^{\top}\Qb\Zb_{\bk}\), which can be viewed as an equivalent
reparameterization that simplifies analysis without changing expressivity~\cite{huang2023context,huang2025a,yang2025multi}.

\item {\em Feed-forward network (FFN).}
The FFN with parameter \(\Wb \in \RR^{5\times d\times m \times d_c}\) is defined by
\[
\ffn_{i,j}(\Wb,\Xb)
\;\coloneqq\;
\sum\nolimits_{r\in[m]} \mathbf{sReLU}\!\Big(\!\dbrack{\Wb_{i,j,r}, \Xb} + b_{i,j,r}\Big),
\quad \forall\, i\in[5],\, j\in[d],
\]
where \(\Xb\) denotes the input,  \(\Wb_{i,j,r}\in\RR^{d_c}\) are neuron weights, \(m\) indicates the number of neurons, and $b_{i,j,r}\in \mathbb{R}$ is some {\it fixed} bias.
\end{itemize}

\noindent With the above layers defined, we are ready to introduce the transformer model used in this work, which is summarized as follows. 
\begin{definition}[Transformer language model]\label{def:transformer-arch}
    Assume that our learner neural network \(F\) is a one-layer decoder transformer block composed of an attention layer with no positional encoding (NoPE) as well as an FFN layer: \(F = \ffn\circ\mathrm{Attention}\), with parameters \(\Wb \in \RR^{5\times d\times m \times d_c}\) and \( \Qb\in\RR^{d_c\times d_c}\). Formally, for the \(i\)-th token position and the \(j\)-th vocabulary index, we have
    \begin{align}
    \Big[F_{i}\big(\Zb^{L,L'}\big)\Big]_j \coloneqq \ffn_{i,j}\Big(\Wb,\mathrm{Attention}\big(\Qb,\Zb^{L,L'}\big)\Big) \in \RR, \quad \forall i\in[5], j\in[d]. \label{eq-def-F-main-1}
    \end{align}
    We interpret \(F(\Zb^\ell)\) as five logit vectors
\(\{F_{i}(\Zb^\ell)\}_{i=1}^{5}\subset\RR^{d}\),
each parameterizing the distribution of the \(i\)-th token of the next clause.
Recall that \(\cV\) is the vocabulary with \(|\cV|=d\) and
\(\tau:\cV\to[d]\) denotes the index map. Given an encoded LEGO sequence \(Z^\ell=(Z_1,\ldots,Z_\ell)\) with embedding
\(\Zb^\ell\), the model’s predictive distribution for the \(i\)-th token of the
\((\ell+1)\)-th clause is the following softmax:
    \begin{equation}\label{eqdef:p_F-distribution}
        p_{F_i}\big(Z_{\ell+1, i} = v \mid Z_1,\dots,Z_\ell\big) = \frac{e^{[F_{i}(\Zb^\ell)]_{\tau(v)}}}{\sum_{j \in [d]} e^{[F_{i}(\Zb^\ell)]_{j}}},\quad \forall v \in \cV.
    \end{equation}
    Now we can sample the next clause \(Z_{\ell+1}\) by sampling from the product distribution 
    $$Z_{\ell+1} = (Z_{\ell+1,1},\dots,Z_{\ell+1,5}) \sim \bigotimes_{i=1}^5 p_{F_i} \eqqcolon p_F,$$ in an autoregressive manner.
\end{definition}

\subsection{LEGO Task via CoT Reasoning and Training Objective}

With the LEGO distribution and the transformer model in place, we now formalize the task within the LEGO framework.
We view solving a length-$L$ LEGO problem as generating a sequence of CoT steps, 
where each step $\ell$ produces an intermediate result used to predict $x_{\ell} = y_{\ell}$, ultimately leading to the final solution $x_{L} = y_{L}$. 
To be precise, we define the following reasoning task.

\begin{definition}[Reasoning tasks $\cT^L$]\label{def:reasoning-task}
We define a family $\{\cT^{L}\}_{L \in \NN^+}$ that captures the ability to solve
sequential reasoning problems.
For each $L \ge 1$, task $\cT^{L}$ measures the model's \emph{accuracy along the chain}
from step $1$ to step $L$:
\begin{equation}\label{eq:acc-L-def}
\mathrm{Acc}_L(F)
= \frac{1}{L}\sum_{0\le L' < L}
\E_{Z^{L}\sim \cD^{L}}
\Big[\E_{ \hat{Z}_{\ans,L'+1} \sim p_F(\cdot \mid Z^{L,L'}) }
\big[\,\1\{\hat{Z}_{\ans,L'+1} = Z_{\ans,L'+1}\}\,\big]\Big],
\end{equation}
where $p_F$ is induced by the model $F$ from \eqref{eqdef:p_F-distribution}.  

\end{definition}

 Clearly, $\Acc_L(F)\in[0,1]$. We say that task $\cT^{L}$ is solved if $\Acc_L(F)\approx 1$. As $L$ grows, $\{\cT^{L}\}_{L \in \NN^+}$ poses increasingly difficult state tracking challenges.  At step $L'$, the model conditions on the partial transcript $Z^{L,L'}$ and
predicts the next answer $Z_{\ans,L'+1}$. To enforce step-by-step generation of the CoT trace, we define the following
training objective.

\begin{definition}[Next clause loss]\label{def:next-clause-loss}
The training objective for $\cT^{L}$ with $L \ge 1$ is the \emph{next clause} loss
\begin{subequations}\label{eq:next-clause-loss}
    \begin{align}
        \Loss^{L}(F) &\coloneqq \sum_{1 \le L' \le L} \Loss^{L,L'}(F), \label{eq:next-clause-loss-def}\\
        \text{where}\quad
        \Loss^{L,L'}(F) &\coloneqq
        \E_{Z^{L,L'}\sim \cD^{L,L'}}\big[ -\log p_{F}(Z_{\ans,L'} \mid Z^{L,L'-1}) \big].
        \end{align}
    \end{subequations}
    Here, $p_F$ is induced by the model $F$ from \eqref{eqdef:p_F-distribution}. 
We also define the per-token loss
$$\Loss^{L,L'}_i(F) \coloneqq
\E_{Z^{L,L'}\sim \cD^{L,L'}} \big[-\log p_{F_i}(Z_{\ans,L',i} \mid Z^{L,L'-1}) \big].$$
\end{definition}
\noindent This is a teacher forcing style CoT objective: the model is given the ground truth answers for all previous steps and is trained to match the next step's answer~\cite{ho2022large, kim2025transformersprovably}.

Further, we adopt the following initialization for training.

\begin{assumption}[Initialization]\label{assump:init}
    Let \(F\) be the transformer network in \Cref{def:transformer-arch} with parameters \(\Wb\) and \(\Qb\). The attention parameter is initialized to zero:
    \(\Qb^{(0)}=\mathbf{0}_{d_c\times d_c}\).
    The FFN weights are initialized randomly and independently as
    \(\Wb_{i,j,r}^{(0)} \sim \mathcal{N}(\mathbf{0},\,\sigma_0^{2}\mathbf{I}_d)\)
    with \(\sigma_0 = d^{-1/2}\).
    The biases are not trained and instead fixed at
    \(b_{i,j,r} = \sigma_0 \log d\) for all \(i,j,r\), chosen to keep most sReLU
    units active at initialization.
    All random draws are independent across indices. 
\end{assumption}

\paragraph{Additional Technical Assumptions.} Additionally, we introduce a couple of technical assumptions to be used throughout the proof. To control rare large deviations in the logits during training-time analysis,
  we adopt a bounded-output assumption as stated below.

  \begin{assumption}[Logit clipping]\label{assumption:output-bound}
  There exists $B = C_B \log d$, for some sufficiently large constant $C_B>0$, such
  that each coordinate of the raw model output $F_i$ is clipped from above:
  \[
    [F_i]_j \;\leftarrow\; \min\{[F_i]_j,\, B\}
    \qquad \text{for all } i,j .
  \]
  \end{assumption}
\noindent This coordinate-wise clipping is a technical device to control large-deviation
  tails and simplify the analysis of the dynamics; $B$ can be chosen large enough to
  avoid interfering with the regimes we study.
  

Moreover, to simplify the analysis of attention dynamics, we impose a fixed block-sparsity
pattern on the attention parameter \(\Qb\).
  \begin{assumption}[Block-sparse attention matrix]\label{assump-Q-structure}
    Let \(\Qb = [\Qb_{p,q}]_{p,q\in[5]} \in \RR^{5d\times 5d}\) be partitioned into
    \(5\times 5\) blocks with \(\Qb_{p,q}\in\RR^{d\times d}\).
    We assume that
    \[
    \Qb_{p,q} \equiv \mathbf{0}_{d\times d}
    \quad \text{for all } (p,q)\notin\{(4,3),(4,4)\},
    \]
    i.e., only the blocks \((4,3)\) and \((4,4)\) are trainable.  
    \end{assumption}
\noindent This block-sparsity pattern, which zeros out most inter-token attention, is standard in
    recent theoretical analyses of transformer training dynamics~\cite{huang2023context,huang2025a,yang2025multi,yang2024context,cheng2025transformers}.
    Importantly, although sparse at the \(5\times 5\) token level, the two retained
    blocks \((4,3)\) and \((4,4)\) are fully \emph{dense} \(d\times d\) matrices
    trained without constraints, leaving \(2d^{2}\) free parameters and thus a
    substantive, non-trivial learning problem.

\section{Learning CoT on Simply Transitive Actions}\label{sec:cyclic}

In this section, we present our first main results for the case where the group action \(\cG\) on \(\cY\) is simply transitive, which is isomorphic to the action of the cyclic group \(C_n\) on \(\ZZ_n\).

\begin{algorithm}[t]
    \caption{Curriculum training for simply transitive actions}
    \label{alg:cot-transitive-training}
    \KwIn{Model $F^{(0)}$ with parameters $(\Wb^{(0)}, \Qb^{(0)})$; Learning rate $\eta$; Stage snapshots \(T_1\), \(T_2\).
    }
    
    \BlankLine
    \textbf{Stage $1$:} Learning one-step reasoning (\(\mathcal{T}^{1}\)) \;
    \Indp
    \For(\tcp*[f]{\small Update the FFN parameter $\Wb$}){$t = 1$ \KwTo $T_1$}{
       $\Wb^{(t)}\gets \Wb^{(t-1)}-\eta \nabla_{\Wb}\Loss^{1}(F^{(t-1)}) $ \;
       $\Qb^{(t)}\equiv \Qb^{(t-1)}$\;
    }
    \Indm
    
    \BlankLine
    \textbf{Stage $2$:} Learning two-step reasoning for length extension (\(\mathcal{T}^{2}\)) \;
    \Indp
    \For(\tcp*[f]{  \small Update the attention parameter $\Qb$}){$t = T_1+1$ \KwTo $T_2$}{
      $\Qb^{(t)}\gets \Qb^{(t-1)}-\eta \nabla_{\Qb}\sum_{\ell=1}^2\Loss^{2,\ell}_{5}(F^{(t-1)})$ \;
      $\Wb^{(t)}\equiv \Wb^{(t-1)}$\;
    }
    \Indm
    \KwOut{ Model $F^{(T_1+T_2)}$.}
    \BlankLine
    \end{algorithm}

\begin{assumption}[Simply transitive group action]\label{assump:structure-1}
    Let \(\cY = \{0,1,\dots,n_y-1\}\) with
\(n_y \in [\,\Omega(\log\log d),\) \( \log d\,]\).
Assume the group \(\cG\) acts simply transitively on \(\cY\): the action
is transitive and for any \(y_1,y_2\in\cY\) there exists a unique
\(g\in\cG\) such that \(g\cdot y_1 = y_2\).
Equivalently, for any fixed \(y\in\cY\) the map \(g\mapsto g\cdot y\) is a
bijection from \(\cG\) to \(\cY\), so \(|\cG|=|\cY|=n_y\).
\end{assumption}

Our first main result demonstrates that, for such a simple task the model obtained via Algorithm~\ref{alg:cot-transitive-training} for short-chain tasks $\cT^1$ and $\cT^2$, successfully generalizes to significantly longer tasks. 

\begin{theorem}\label{thm:length-generalization}
Under Assumptions~\ref{assump:asymptotic-regime}, \ref{assump:lego-data-distribution}, \ref{assump:init},  \ref{assumption:output-bound},  \ref{assump-Q-structure} and \ref{assump:structure-1}, for some constant $0<c^{*}<1$, $n_y<m\ll \log^2 d$, the transformer model \(F^{(T_1 + T_2)}\) obtained by Algorithm~\ref{alg:cot-transitive-training} with learning rate \(\eta = \frac{1}{\poly(d)}\),  and stage 1 and 2 iteration $T_1=\tilde{O}\Big(\frac{d}{\eta (\sigma_0)^{q-2}}\Big)$, $T_2=\tilde{O}\Big(\frac{\poly (d)}{\eta \sigma_0 }\Big)$ satisfies 
\begin{enumerate}[itemindent=0pt,leftmargin=2em]
    \item {\bf Direct short-to-long length generalization:}
    \begin{align}\label{eq:direct-length-generalization}
        \mathrm{Acc}_L\!\Bigl(F^{(T_1 + T_2)}\Bigr) \;\geq\; 1 - \frac{1}{\poly(d)}, \text{ for every \(L \leq O(d^{c^{*}})\), }
    \end{align}

i.e., \(F^{(T_1 + T_2)}\), which is trained for task \(\cT^1\) and \(\cT^2\), generalizes to solve the tasks \(\cT^{\ell}, \ell \leq L\). 
\item {\bf Attention concentration:}\footnote{For readability, we abbreviate \(\attn_{\kk\to\kk'}(\Qb^{(t)},\Zb)\) by \(\attn^{(t)}_{\kk\to\kk'}\), omitting explicit dependence when immaterial.}
given $Z^{2,\ell}$ with $\ell\in\{0,1\}$, we have
\begin{align}\label{eq:attention-concentration-cyc}
    \attn^{{(T_1 + T_2)}}_{\ans,\ell \to \pred,\ell+1}+\attn^{{(T_1 + T_2)}}_{\ans,\ell \to \ans,\ell} \geq 1 - O\Big(\frac{1}{d^{c^{*}}}\Big).
    \end{align}
\end{enumerate}
\end{theorem}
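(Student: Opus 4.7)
The plan is to analyze the two stages of Algorithm~\ref{alg:cot-transitive-training} separately, exploit the block-sparsity structure of $\Qb$ from Assumption~\ref{assump-Q-structure} to obtain a clean gradient decomposition, and then transfer the attention-concentration pattern from training length $L=2$ to any length $L \leq O(d^{c^*})$ via a robustness argument.

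\textbf{Stage 1 (FFN with zero attention).} Because $\Qb^{(0)} = \mathbf{0}$, the attention output at the query $Z_{\ans,0}$ is simply the uniform average $\tfrac{1}{2}(\Zb_{\pred,1}+\Zb_{\ans,0})$. Orthonormality of the embeddings and the blank-token convention make this input additively separable by token position, so predicting $y_1 = g_1(y_0)$ reduces to a lookup table over the $|\cG|\cdot|\cY| \leq \log^{2C_0} d$ pairs, while predicting $x_1$ is a pure copy task. I would show by a standard feature-learning / neuron-alignment argument that a width-$m \geq n_y$ sReLU network with the bias $\sigma_0 \log d$ fixed in Assumption~\ref{assump:init} selects and aligns the relevant neurons while suppressing the others, driving $\Loss^{1}(F^{(T_1)}) \leq 1/\poly(d)$ within the $T_1$ budget and leaving the trained FFN with an $\Omega(1)$ decision margin around each $(g,y)$ target.

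\textbf{Stage 2 (attention concentration with frozen FFN).} Under Assumption~\ref{assump-Q-structure}, the score from query $Z_{\ans,\ell}$ to key $Z_{\bk}$ reduces to $e_{x_\ell}^\top \Qb_{43}\, \Zb_{\bk,3} + e_{x_\ell}^\top \Qb_{44}\, \Zb_{\bk,4}$, and only $\Zb_{\pred,k,3}=e_{x_{k-1}}$ and $\Zb_{\ans,k,4}=e_{x_k}$ carry non-zero signal. Starting from uniform attention and back-propagating through the Stage-1 FFN, I expect the gradient of $\Loss^{2,\ell}_{5}$ with respect to each attention weight to be $+\Omega(1)$ at the two matching keys $(\pred,\ell+1)$ and $(\ans,\ell)$ (where increased attention reduces the value loss the FFN was already trained to minimize on the clean average), and at most $O(1/d)$ at distractors (by orthogonality of variable embeddings). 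Averaging over samples drives $\Qb_{43}^{(t)} \to c^{(t)} \sum_{x \in \cX} e_x e_x^\top$ and similarly for $\Qb_{44}^{(t)}$, with $c^{(t)}$ growing polynomially in $t$ since the softmax gradient stays $\Omega(1/\poly(d))$ until $c^{(t)} = \Omega(\log d)$. The allowance $T_2 = \tilde{O}(\poly(d)/(\eta\sigma_0))$ then suffices to push $c^{(T_1+T_2)} \geq 2 c^* \log d$ for a suitable $c^* \in (0,1)$, while the off-diagonal mass of $\Qb$ stays $o(1)$, already giving \eqref{eq:attention-concentration-cyc} at $L=2$.

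\textbf{Length-$L$ transfer.} On any sequence with $L \leq O(d^{c^*})$, Assumption~\ref{assump:lego-data-distribution} guarantees distinct variables $x_0,\dots,x_L$, so at query $Z_{\ans,\ell}$ exactly one predicate and one answer key contain $e_{x_\ell}$. Softmax concentration then yields
\[
\attn^{(T_1+T_2)}_{\ans,\ell \to \pred,\ell+1} + \attn^{(T_1+T_2)}_{\ans,\ell \to \ans,\ell} \;\geq\; \frac{2 e^{c^{(T_1+T_2)}}}{2 e^{c^{(T_1+T_2)}} + O(L)} \;\geq\; 1 - O\!\left(L\, d^{-2c^*}\right) \;=\; 1 - O(d^{-c^*}),
\]
so the FFN input on a length-$L$ sequence is an $O(d^{-c^*})$-perturbation of the clean two-clause average the FFN was trained on in Stage 1. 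The $\Omega(1)$ decision margin of the trained FFN ensures the value token is still produced correctly with probability $1-1/\poly(d)$, and a union bound over $\ell<L$ delivers \eqref{eq:direct-length-generalization}.

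\textbf{Main obstacle.} The hardest step will be the quantitative Stage-2 dynamics: proving that the diagonal pattern $c^{(t)} \sum_x e_x e_x^\top$ genuinely emerges in the dense $d \times d$ matrices $\Qb_{43}, \Qb_{44}$ with $c^{(T_1+T_2)} = \Omega(\log d)$, while simultaneously bounding the accumulated off-diagonal noise to $o(1)$ across the entire $|\cX|$-size vocabulary even though each sample touches only finitely many variables. Coupled with this, the length-transfer step needs a careful robustness argument that the $\Omega(1)$ margin of the Stage-1 FFN dominates the softmax leakage incurred at evaluation length $L$; this leakage-vs.-margin comparison is precisely what forces $c^{*}$ to be strictly below $1$.
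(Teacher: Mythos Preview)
Your proposal has the right skeleton but contains a genuine gap in Stage~2: you treat $\Qb_{4,3}$ and $\Qb_{4,4}$ symmetrically, asserting that both are driven toward $c^{(t)}\sum_x e_x e_x^\top$ with a common scalar $c^{(t)}$, and that the gradient is $+\Omega(1)$ at \emph{both} matching keys $(\pred,\ell+1)$ and $(\ans,\ell)$. This is incorrect. After Stage~1, the FFN trained on the uniform average $\tfrac12(\Zb_{\pred,1}+\Zb_{\ans,0})$ assigns a large logit not only to the correct answer $\tau(g_1(y_0))$ but also to the spurious one $\tau(g_2(y_0))$, because $g_2$ and $y_0$ are both present in the averaged input. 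When you differentiate $\Loss^{2,1}_5$ through this FFN, the spurious logit pushes $\Qb_{4,3}$ \emph{up} (more attention on $\pred,1$ brings in $g_1$, which suppresses the spurious class) but pushes $\Qb_{4,4}$ \emph{down} (more attention on $\ans,0$ brings in $y_0$, which reinforces the spurious class). So initially $[\Qb_{4,3}]_{s,s}$ grows while $[\Qb_{4,4}]_{s,s}$ can even decrease slightly, and a gap $[\Qb_{4,3}]_{s,s}-[\Qb_{4,4}]_{s,s}=\Omega(1/\log d)$ forms. The paper's proof tracks this through three distinct sub-stages: the gap first opens (Stage~2.1, driven by $\ell=1$), then both diagonals grow together once $\Qb_{4,3}$ is large enough to activate the $\ell=2$ gradient (Stage~2.2), and finally the gap shrinks back to $o(1)$ via a feedback mechanism from the wrong logit $\tau(g_2(y_0))$ at $\ell=2$ (Stage~2.3).

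This missing balance control breaks your length-transfer argument. If at the end of training $\alpha:=\attn_{\ans,\ell\to\pred,\ell+1}$ and $\beta:=\attn_{\ans,\ell\to\ans,\ell}$ satisfy $\alpha+\beta=1-O(d^{-c^*})$ but $|\alpha-\beta|=\Theta(1)$, then the FFN input $\alpha\,\Zb_{\pred,\ell+1}+\beta\,\Zb_{\ans,\ell}$ is \emph{not} an $O(d^{-c^*})$-perturbation of the Stage-1 training input; it differs by $\Theta(1)$ in norm. Concretely, the wrong class $j'=\tau(g_{\ell+1}(y'))$ for some $y'\neq y_\ell$ has an activated neuron $r_{g_{\ell+1}\cdot y'}$ with $V_{j',r}(g_{\ell+1})\approx B$ and $V_{j',r}(y_\ell)\approx -B$, so its pre-activation is $\approx(\alpha-\beta)B$, and symmetrically the class $\tau(g'(y_\ell))$ gets $\approx(\beta-\alpha)B$. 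Without $|\alpha-\beta|=o(1)$ you cannot force these wrong logits below $o(\log d)$, and the $1/\poly(d)$ accuracy bound fails. The paper establishes $\Delta^{2,2}\leq O\big((d^{1.01}\epsilon)^{1/(q-1)}/\log d\big)=o(1)$ precisely to close this hole; your ``leakage-vs.-margin'' framing in the obstacle paragraph does not capture it, because the issue is the \emph{ratio} of the two correct attentions, not the leaked mass.
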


\begin{figure}[t]
    \centering
    \vspace{2em}
\begin{align*}
  &\text{\bf  Retrieve (Attention): }   \qquad {\mboxed{{x_1} = g_1(x_0)}},\ \tikzmarknode{first}{
     \mboxed{ x_2 = \textcolor{RedOrange}{g_2}(x_{1})}}, 
    \quad   
      {\mboxed{x_0 = y_0}},\
      \tikzmarknode{last} {\underbrace{\mboxed{{{x_{1}}} = 
      \textcolor{RedOrange}{y_1}
      }}_{\textcolor{NavyBlue}{\text{query}}}}
       ,\
      \mboxed{x_{2} = {\underline{?}}}\\
      &\text{\bf  Apply group action (FFN): }  \qquad \qquad \qquad \qquad y_2 = g_2\cdot y_1
  \end{align*}

\begin{tikzpicture}[remember picture,overlay,line cap=round,line join=round]
\draw[->,thick,shorten >=2pt,shorten <=2pt]
(last.north)
.. controls ($(last.north)+(0,\vpad)$)
       and ($(first.north)+(0,\vpad)$)
.. node[midway, below, sloped, inner sep=1pt]
{{\scriptsize $\attn_{\ans,1 \to \pred,2}$}}
(first.north);

\draw[->,thick,shorten >=2pt,shorten <=2pt]
(last.north)
.. controls ($(last.north)+(\hpad,0)$)
       and ($(last.north)+(\hpad,\vpad)$)
.. ($(last.north)+(0,\vpad)$)
.. controls ($(last.north)+(-\hpad,\vpad)$)
       and ($(last.north)+(-\hpad,0)$)
       .. node[pos=0, above, sloped,  inner  sep=1pt]
       { {\scriptsize $\attn_{\ans,1 \to \ans,1}$}}
        (last.north);
\end{tikzpicture}
\caption{Illustration of how the model solves the LEGO task: given $Z^{2,1}$, the goal is to predict  $y_2$.}
\label{fig:toy-example}

\end{figure}
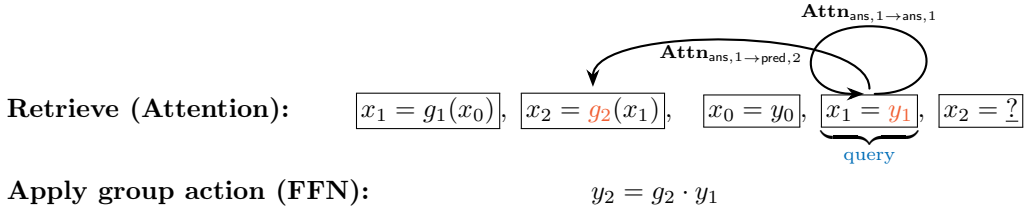

\paragraph{{Mechanism of CoT for state-tracking.}}
Given the current sequence \(Z^{L,\ell}\) with intermediate steps up $\ell$, predicting the next state \(y_{\ell+1}=g_{\ell+1}(y_\ell)\) requires two steps:
\begin{itemize}
    \item[(i)] \textbf{Retrieve} the correct action \(g_{\ell+1}\) from the context clause \(Z_{\pred,\ell+1}\) and the current state \(y_\ell\) from the answer clause \(Z_{\ans,\ell}\); 
    \item[(ii)] \textbf{Apply the group action}, that is, compute the next state \(y_{\ell+1} =  g_{\ell+1}(y_\ell)\).
\end{itemize}
It is well established that attention can implement content-based retrieval~\cite{elhage2021mathematical}, and that FFN can represent the
group operation~\cite{lu2017expressive} (see \Cref{fig:toy-example} for an example). Algorithm~\ref{alg:cot-transitive-training} decouples learning the
attention (retrieval) and FFN (action) components to simplify the analysis. For task \(\cT^1\), the transcript \(Z^{1,0}\) contains only the two relevant
clauses, \(Z_{\pred,1}\) and \(Z_{\ans,0}\), without useless contents. Fixed uniform attention ($\Qb$ initialized to be zero in \Cref{assump:init}) therefore suffices to expose both clauses to the FFN, and we optimize the FFN to learn the group operation. The high accuracy for \(\cT^1\) in \Cref{thm:length-generalization} indicates that the FFN has indeed \emph{learned} to apply the operation correctly. For task \(\cT^2\), with the FFN already trained, the attention layer only needs to learn to route the correct context to the FFN input. The attention concentration result in \eqref{eq:attention-concentration-cyc} confirms that the learned routing pattern is correct. \Cref{fig:attn-con,fig:attn-con-perm} also demonstrate the attention concentration patterns empirically.

\paragraph{How does attention concentration induce strong length
generalization?} As we increase the chain length in \(\cT^{\ell}\) for \(\ell>2\), the FFN layer
remains largely insensitive to input length since the learned group action is location-invariant.
By contrast, the attention layer is affected: more \emph{irrelevant} clauses
appear, so retrieval must scan over longer contexts, which risks diluting attention on
the relevant clause. 
\Cref{thm:length-generalization} guarantees that training on short chains
already yields attention concentration with error \(O(d^{-c^{*}})\). This
“purity” allows the model to tolerate dilution and maintain high attention on
the relevant clauses for chain lengths up to \(O(d^{c^{*}})\). Technically, this concentration arises because the query matrix \(\Q\)
learns to locate the same variable \(x_{\ell}\) that appears simultaneously:
the third token of the context clause \(Z_{\pred,\ell+1}\) and the fourth token
of the answer clause \(Z_{\ans,\ell}\). This co-occurrence furnishes a strong,
consistent signal that enables robust retrieval across longer chains, which will be elaborated in the proof overview in \Cref{sec:overview-attention}.

\section{Learning CoT on Symmetry Group Actions on $\mathbb{Z}_n$}\label{sec:symmetry}

We now turn to the case where the action group \(\cG\) is isomorphic to the symmetry group, under \Cref{assump:structure-2}. In this case, the problem is  $\mathsf{NC}^1$-complete for $n_y\geq 5$.

\begin{assumption}[Symmetry group actions]\label{assump:structure-2}
    Let \(\cY = \{0,1,\dots,n_y-1\}\).
We set \(\cG = \textbf{Sym}(\cY)\), the symmetry group of all permutations of
\(\cY\), so that \(|\cG| = n_y!\).
We let \(\cG\) act on \(\cY\) in the natural way and write \(g\cdot y\) (or
\(g(y)\)) for the image of \(y\in\cY\) under \(g\in\cG\).
For example, when \(\cY=\{0,1,2\}\) and \(g\) swaps \(0\) and \(1\), then
\(g(0)=1\), \(g(1)=0\), and \(g(2)=2\).
We assume \(n_y = \Theta\!\big(\tfrac{\log\log d}{\log\log\log d}\big)\), and
hence \(|\cG| = n_y! = \polylog d\).

\end{assumption}

\begin{algorithm}[t]
\caption{Recursive self-training for symmetry actions}
\label{alg:cot-symmetry-training}
\KwIn{Model $F^{(0)}$ with parameters $(\Wb^{(0)}, \Qb^{(0)})$; Learning rate $\eta$; 
Error degree $\mathsf{E}_1>0$ (constant) ; 
$\tau_1,\tau_2=\tilde{O}(\frac{\poly d}{\eta})$; Total Stage $K$.}

\BlankLine
\textbf{Stage $1.1$:} Train FFN for one-step reasoning (\(\mathcal{T}^{1}\)) \;
\Indp
\For(\tcp*[f]{\small Update the FFN parameter $\Wb$}){$t = 1$ \KwTo $\tau_1$}{
   $\Wb^{(t)}\gets \Wb^{(t-1)}-\eta \nabla_{\Wb}\Loss^{1}(F^{(t-1)}) $ \;
   $\Qb^{(t)}\equiv \Qb^{(t-1)}$\;
}
\Indm

\BlankLine
\textbf{Stage $1.2$:} Train attention for length extension (\(\mathcal{T}^{2}\)) \;
\Indp
\For(\tcp*[f]{  \small Update the attention parameter $\Qb$ }){$t = \tau_1+1$ \KwTo $\tau_1+\tau_2$}{
  $\Qb^{(t)}\gets \Qb^{(t-1)}-\eta \nabla_{\Qb}\Loss^{2,2}_{5}(F^{(t-1)})$ \;
  $\Wb^{(t)}\equiv \Wb^{(t-1)}$\;
}
${T}_{1}\gets t$\;

\Indm
\BlankLine
\textbf{Till Stage $K$:} Recursive self-train for length extension \;
\Indp
\For(\tcp*[f]{{\small Stage $k$ to solve \(\mathcal{T}^{2^k}\)}}){$k = 2$ \KwTo $K$}{
    $L \gets 2^{k}$, $\tilde{F}^{(k)} \gets F^{(T_{k-1})} $\;
    \While(\tcp*[f]{  \small Update the attention parameter $\Qb$ }){$\Loss^{L,2}_{\tilde{F}^{(k)}, 5}\bigl(F^{(t-1)}\bigr)>\frac{1}{d^{\mathsf{E}_1}}$}{
        $t\gets t+1$\;
        $\Qb^{(t)}
           \;\gets\;
           \Qb^{(t-1)}
           -\eta\,
           \nabla_{\Qb}\,
           \Loss^{L,2}_{\tilde{F}^{(k)}, 5}\bigl(F^{(t-1)}\bigr)$\;
        $\Wb^{(t)} \equiv \Wb^{(t-1)}$\;
    }
     ${T}_{k}\gets t$\;
}
\Indm
\KwOut{ Models $\{F^{({T}_k)}\}_{k=1}^{K}$.}
\BlankLine

\end{algorithm}

\paragraph{Challenges of learning CoT for the symmetry task.}  When the model tries to retrieve the group element \(g_{\ell+1}\) from the
correct context clause \(Z_{\pred,\ell}\), there can be other context clauses
whose group elements also send \(y_{\ell}\) to \(y_{\ell+1}\); we call these
\emph{distractor} clauses.
In the symmetry case on \(\cY\), each pair
\((i,j)\) admits \((n_y-1)!\) elements mapping \(i\) to \(j\), so the fraction
of distractors is substantial.
By contrast, in the simply transitive setting each pair has a unique element,
so distractors are unlikely and can be ignored. Attending to distractors still produces the correct next answer, so training
may converge with \textbf{insufficient attention concentration}.
This weaker concentration makes the attention layer less robust to dilution in
longer contexts.  
Hence, for this harder setting, directly proving \(d^{\Omega(1)}\) length CoT
generalization from constant-length training is difficult.

\paragraph{Self-improvement for reasoning length extension.} Recent empirical studies~\cite{singh2024human, gulcehre2023reinforced, lee2025selfimproving} show that \emph{length generalization} can be bootstrapped via model \textit{self-improvement}: models training on their own output can bootstrap their capability to solve longer problems. In particular, the work \cite{lee2025selfimproving} motivates a \textbf{recursive self-training} scheme for the symmetry task. To perform recursive self-training, we adopt the greedy language model as data annotator: the greedy language model \(\widehat{p}_F\) induced by the network \(F\) is defined by
\begin{equation}
    \widehat{p}_{F}(Z_{\ans,L'+1}|Z^{L,L'}) = \begin{cases}
        1, & \text{if } Z_{\ans,L'+1} = \argmax_{Z} p_F(Z|Z^{L,L'}), \\
        0, & \text{otherwise.}
    \end{cases}
\end{equation}

\noindent Now we can define the self-annotated LEGO data distribution:
\begin{definition}[Bootstrapped LEGO distribution]\label{def:bootstrap-lego-distribution}
    We define \(\cD_{F}^{L,L'}\) as the LEGO distribution in \Cref{assump:lego-data-distribution} except that the answers \(Z_{\ans, \ell}, 1 \leq \ell \leq L'\) is given recursively by sampling the prediction \(Z_{\ans, \ell} \sim \widehat{p}_{F}(\cdot|Z^{L,\ell-1}), 1\leq \ell\leq L'\) from the greedy language model \(\wh{p}_F\).
\end{definition}

\begin{definition}[Self-training loss]\label{def:self-training-loss}
Given a (fixed) model $\tilde{F}$ and length $L$, The self-training next-clause-prediction loss is defined by replacing \(\cD^{L,L'}\) with \(\cD_{F}^{L,L'}\)  (\Cref{def:bootstrap-lego-distribution}) in \eqref{eq:next-clause-loss}:
\begin{subequations}\label{eq-self-loss}  
    \begin{align}
        \Loss^{L,L'}_{\tilde{F}}(F)
        \triangleq \E_{Z^{L,L'}\sim \cD_{\tilde{F}}^{L,L'}}\left[ -\log p_{F}(Z_{\ans,L',i} \mid Z^{L,L'-1}) \right],\\
      \Loss^{L,L'}_{\tilde{F}, i} = \E_{Z^{L,L'}\sim \cD_{\tilde{F}}^{L,L'}}[-\log p_{F_i}(Z_{\ans,L',i} \mid Z^{L,L'-1})]  \quad \text{ for } i\in[5].   
        \end{align}
        \end{subequations}

\end{definition}

We now present our main results, establishing that a recursive self-training scheme can provably bootstrap the reasoning length for the symmetry LEGO task.

\begin{theorem}\label{thm:length-gen-self-training}
Assume the distribution \(\cD^{L}\) induced from \(\lego(\cX,\cG,\cY)\) satisfies \Cref{assump:asymptotic-regime}, \ref{assump:lego-data-distribution} and \ref{assump:structure-2}, and assume the transformer network satisfies Assumption~\ref{assump:init}, \ref{assumption:output-bound}, \ref{assump-Q-structure}, and $n_y<m\ll \log^2 d$. Then for any $1 \leq k < \log_2|\cX|$, the transformer $F^{(T_{k})} $ trained via Algorithm~\ref{alg:cot-symmetry-training} up to length \(L_{k} = 2^{k}\) and \(T_k = O(\frac{\poly(d)}{\eta})\) satisfies:
\begin{enumerate}[itemindent=0pt,leftmargin=2em]
    \item {\bf Constant-factor length generalization:} \(F^{(T_k)}\) is able to solve \(\cT^{L_{k+1}}\) with \({L_{k+1}}=2^{k+1}\)
    \begin{align}\label{eq:constant-factor-length-generalization}
        \mathrm{Acc}_{L_{k+1}}\Bigl(F^{(T_{k})}\Bigr) = 1 - \frac{1}{\poly(d)}.
    \end{align}

\item {\bf Attention concentration:} given $Z^{L_{k},\ell}$ with $\ell\in\{0,\dots,L_{k}-1\}$, we have
\begin{align}\label{eq:attention-concentration-self}
    \attn^{{(T_k)}}_{\ans,\ell \to \pred,\ell+1}+\attn^{{(T_k)}}_{\ans,\ell \to \ans,\ell} \geq 1 - \tilde{c},
    \end{align}
where $\tilde{c}$ is some sufficiently small constant (smaller than $0.01$).
\end{enumerate}

\end{theorem}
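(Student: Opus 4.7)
My plan is to prove Theorem 5.1 by strong induction on the stage index $k$, carrying through two invariants at each stage: (I$_k$) $\mathrm{Acc}_{L_k}(F^{(T_{k-1})})\ge 1-1/\poly(d)$, which guarantees that the greedy self-labels at stage $k$ are reliable, and (II$_k$) the attention-concentration bound \eqref{eq:attention-concentration-self} on $F^{(T_{k-1})}$ for length-$L_k$ inputs, which supplies the softmax margin that will tolerate one further length doubling. The base case $k=1$ corresponds to Stages 1.1 and 1.2 of Algorithm 2, and the inductive step handles the while-loop at each $k\ge 2$. Assumption 3.7 is used throughout to restrict the analysis to the $(4,3)$ and $(4,4)$ blocks of $\Qb$, and Assumption 3.6 (logit clipping) to bound per-step gradient magnitudes uniformly.

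For the base case, Stage 1.1 freezes $\Qb=\mathbf 0$ so that attention is uniform on $Z^{1,0}$, and I reuse the FFN feature-emergence argument from the proof of Theorem 4.1: under the sReLU activation and the initialization of Assumption 3.5, a neuron-wise gradient analysis shows that the FFN progressively aligns with each $(g,y)$ pair and realizes $y\mapsto g(y)$ after $\tau_1=\tilde O(d/(\eta\sigma_0^{q-2}))$ iterations. Since $|\cG|=n_y!=\polylog d$ under Assumption 5.1 and $n_y<m\ll\log^2 d$, the neuron budget suffices. Stage 1.2 then trains $\Qb$ on $\Loss^{2,2}_5$; the softmax gradient drives the on-target scores $\Zb_{\ans,1}^\top\Qb\,\Zb_{\pred,2}$ and $\Zb_{\ans,1}^\top\Qb\,\Zb_{\ans,1}$ upward through the shared variable $x_1$ appearing in both clauses. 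The key distinction from the simply transitive case is that every pair $(y_1,y_2)$ with $y_2=g(y_1)$ admits $(n_y-1)!$ distinct elements of $\cG$ mapping $y_1$ to $y_2$, so predicate clauses with a matching action incur zero prediction loss once attended to; a fixed-point analysis of the softmax gradient then shows that the margin saturates at a constant level and attention concentration stabilizes at only $1-\tilde c$ for some sufficiently small constant $\tilde c$.

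For the inductive step at $k\ge 2$, invariant (I$_k$) and a union bound over the $L_k$ positions imply that the greedy self-labels from $\tilde F^{(k)}=F^{(T_{k-1})}$ agree with the true chain with probability $1-L_k/\poly(d)$, hence $\|\cD^{L_k,L'}_{\tilde F^{(k)}}-\cD^{L_k,L'}\|_{\mathsf{TV}}=1/\poly(d)$. The stage-$k$ update on $\Loss^{L_k,2}_{\tilde F^{(k)},5}$ is therefore, up to a $1/\poly(d)$ correction, a softmax-margin update on $\Qb$ against one correct target and $L_k-1$ distractor predicate clauses, of which only an expected $O(L_k/n_y)$ are image-matching distractors that contribute zero loss but positive attention mass. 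Re-running the base-case fixed-point analysis with this distractor count preserves the $1-\tilde c$ concentration at training length $L_k$, and NoPE invariance extends the bound to every query position, establishing (II$_{k+1}$). Finally, a short calculation shows that doubling the context to $L_{k+1}=2L_k$ at most doubles the off-target attention mass, yielding an on-target probability of at least $(1-\tilde c)/(1+\tilde c)\ge 1-O(\tilde c)$; combined with the FFN robustness margin inherited from Stage 1.1, this delivers $\mathrm{Acc}_{L_{k+1}}(F^{(T_k)})\ge 1-1/\poly(d)$, establishing (I$_{k+1}$). Iterating across $k=1,\dots,\log_2|\cX|$ stages completes the proof.

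The main obstacle, as I see it, is controlling the composition of the two invariants across $K=\Theta(\log d)$ stages. Two coupled difficulties arise. First, $\nabla_\Qb\Loss^{L_k,2}_{\tilde F^{(k)},5}$ contains signed contributions from distractor clauses whose key embeddings have nonzero inner product with the target; these can partly cancel the signal direction and must be handled by decomposing the gradient into a signal component supported on the target-variable subspace and a noise component orthogonal to it, together with a Gr\"onwall-style estimate ensuring the signal dominates uniformly in $k$. Second, the $1/\poly(d)$ TV gap between self-labeled and true data must remain subdominant to the softmax margin at every stage; here Assumption 3.6 is essential, since clipping logits at $B=C_B\log d$ caps per-step gradient magnitudes and prevents the TV error from exploding through the softmax nonlinearity as $k$ grows. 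Verifying that both invariants survive a $\Theta(\log d)$-length induction without the accuracy polynomial degree deteriorating is, in my view, the heart of the argument.
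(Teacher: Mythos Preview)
Your high-level inductive scaffold matches the paper: induct on $k$, use stage-$(k-1)$ accuracy to certify that greedy self-labels coincide with ground truth, retrain $\Qb$ on length-$L_k$ data to restore attention concentration, then argue that doubling to $L_{k+1}$ at most doubles the off-target attention mass. A few points where your sketch diverges from, or underspecifies relative to, the paper's actual argument are worth flagging.

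First, the FFN analysis cannot be ``reused from Theorem 4.1.'' The paper carries out a separate FFN analysis for symmetry actions: because each fiber $\fiber_{j,y}$ contains $(n_y-1)!$ group elements, the learned features satisfy $V_{j,r}(g)\approx C_\alpha V_{j,r}(y)$ with $C_\alpha=\Theta(n_y)$, not the balanced $V_{j,r}(g)\approx V_{j,r}(y)$ of the simply transitive case. This imbalance feeds directly into the $\Lambda$ computations that drive the attention analysis and is part of why the concentration saturates at a constant rather than vanishing level.

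Second, the paper does not use a ``fixed-point'' or ``Gr\"onwall'' argument for the attention dynamics. Instead it tracks the sum $[\Qb_{4,3}]_{s,s}+[\Qb_{4,4}]_{s,s}$ (monotone increasing) and bounds the gap $\attn_{\ans,1\to\pred,2}-\attn_{\ans,1\to\ans,1}$ from both sides by a self-correcting mechanism: whenever the gap crosses a threshold in either direction, a specific wrong-class logit (e.g.\ $j_2'=\tau(g_2(y_0))$) becomes large enough to flip the sign of the relevant gradient component and push the gap back. The distractor count is controlled not by its expectation $O(L_k/n_y)$ but by high-probability balls-in-bins events $\cE_{L_k,1},\cE_{L_k,3}$ bounding the maximum per-value collision count. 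Your signal/noise decomposition in the ``target-variable subspace'' is a plausible alternative framing, but you would still need the two-sided gap control, which is the technical heart of the stage analysis.

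Third, your TV-accumulation concern is less delicate than you fear. Greedy labeling is deterministic, so on the $1-1/\poly(d)$ event where the stage-$(k-1)$ model predicts correctly, the self-labeled and true distributions are \emph{identical}; the paper states this as equality of gradients. The residual $1/\poly(d)$ fraction contributes a perturbation that does not compound across the $\Theta(\log d)$ stages because the concentration level is reset to a fixed small constant at the end of every stage rather than degrading multiplicatively.
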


    At convergence for the current task \(\cT^{L_k}\) (at time
    \(T_k\), the loss
    has fallen below \({1}/{d^{\mathsf{E}_1}}\) in Algorithm~\ref{alg:cot-symmetry-training}), %
    \eqref{eq:attention-concentration-self} confirms that attention concentration
    is still insufficient. Nevertheless,
    \Cref{thm:length-gen-self-training} shows that while this level of
    concentration cannot withstand the dilution from much longer contexts, it is
    sufficient for doubling the length. Consequently, a model trained
    progressively on \(\cT^{L}\) for \(L=1,2,\dots,2^{k}\) generalizes to the
    more challenging task of length \(2^{k+1}\), yielding the following corollary.

\begin{corollary}[Self-improvement for $|\cX|$-length reasoning]\label{cor:length-generalization-self}
    Under the same assumptions as \Cref{thm:length-gen-self-training}, letting  $K = \Theta(\log d)$, for any length $L\leq |\cX|$, the model $F^{(T_{K})}$ trained via Algorithm~\ref{alg:cot-symmetry-training} achieves
$$
\mathrm{Acc}_{L}\!\Bigl(F^{(T_{K})}\Bigr) \geq 1 - \frac{1}{\poly(d)}.
$$

\end{corollary}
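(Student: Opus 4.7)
The plan is to derive the corollary directly from \Cref{thm:length-gen-self-training} by picking $K$ so that $2^{K+1}\ge|\cX|$, and then transferring the guaranteed accuracy at length $2^{K+1}$ to every $L\le|\cX|$ via a monotonicity-in-context-length argument. I would first set $K=\lceil\log_2|\cX|\rceil-1$. Under \Cref{assump:asymptotic-regime}, $|\cG|$ and $|\cY|$ are each at most $\polylog d$, so $|\cX|=d-\polylog d=\Theta(d)$ and therefore $K=\Theta(\log d)$, matching the $K$ in the statement. By construction this $K$ satisfies both $K<\log_2|\cX|$ (so the range on $k$ in the theorem is respected) and $2^{K+1}\ge|\cX|$.

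Applying \Cref{thm:length-gen-self-training} with this choice of $K$ yields $\mathrm{Acc}_{2^{K+1}}\!\bigl(F^{(T_K)}\bigr)\ge 1-1/\poly(d)$, together with the attention-concentration bound \eqref{eq:attention-concentration-self} at every query position $(\ans,\ell)$ inside length-$2^K$ prefixes. A union bound across the $K=\Theta(\log d)$ recursive self-training stages inside \Cref{alg:cot-symmetry-training} contributes only an additional $\Theta(\log d)/\poly(d)=1/\poly(d)$ term to the overall failure probability, since each stage fails with probability at most $1/\poly(d)$.

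The remaining task is to propagate this single length-$2^{K+1}$ guarantee to every $L\le|\cX|$ with the same trained model $F^{(T_K)}$. For a fixed query position $(\ans,L')$ with $L'<L\le 2^{K+1}$, the softmax denominator over $\cI^{L,L'}$ contains a strict subset of the positions in $\cI^{2^{K+1},L'}$, namely only the predicate clauses with index in $[L]$ rather than $[2^{K+1}]$, while the two relevant keys $(\pred,L'+1)$ and $(\ans,L')$ and their embeddings are identical in distribution across the two sequences. Hence, in expectation over the LEGO distribution the attention mass on these two relevant keys is at least as large in the shorter-context case as in the length-$2^{K+1}$ case. Since the frozen FFN (trained in Stage~1.1 and never modified afterwards) realizes the group operation correctly whenever the relevant attention mass exceeds the threshold used in the proof of \Cref{thm:length-gen-self-training}, this gives $\mathrm{Acc}_L\!\bigl(F^{(T_K)}\bigr)\ge 1-1/\poly(d)$ uniformly over $L\le|\cX|$.

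The step I expect to be the main obstacle is making this last transfer fully rigorous in the symmetry setting, because distractor clauses there carry group elements consistent with the observed transition $y_{L'}\mapsto y_{L'+1}$ with positive probability, so their attention scores contribute non-trivially to the softmax denominator. I would address this by reusing the score decomposition already developed for \Cref{thm:length-gen-self-training}: under \Cref{assump-Q-structure} only the $(4,3)$ and $(4,4)$ blocks of $\Qb^{(T_K)}$ are active, the scores at the two relevant keys grow like $\Theta(\log d)$, and every distractor score remains $O(1)$. Together with a concentration bound on the number of strong distractors in a length-$L$ prefix, this yields a lower bound on the expected relevant attention mass of the form $1-\tilde c - O(L/d^{c})$ for some constant $c>0$, which remains bounded below by a constant close to $1$ uniformly in $L\le|\cX|=\Theta(d)$. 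Pushing this through the FFN with the same argument as in the proof of \Cref{thm:length-gen-self-training} then delivers the desired per-length accuracy bound and completes the proof.
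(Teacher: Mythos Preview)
Your high-level plan—apply \Cref{thm:length-gen-self-training} at $k=K$ and then argue that shorter contexts can only help attention concentration—is correct and is essentially what the paper does implicitly: the proof of the restated theorem bounds $\epsilon_{\mathsf{attn}}^{L_{K+1},\ell}\le 4\,\epsilon_{\mathsf{attn}}^{L_K}$ and $\Delta^{L_{K+1},\ell}\le \Delta^{L_K,1}$, then reads off the logit/accuracy bound; the corollary follows by noting $L_{K+1}\ge |\cX|$ and that both controls persist for every $L\le L_{K+1}$. Two substantive corrections are worth making.

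First, your worry about distractors is misplaced and the paragraph you devote to it is unnecessary. Under \Cref{assump-Q-structure}, the only active blocks are $\Qb_{4,3}$ and $\Qb_{4,4}$, so the score $\Zb_{\ans,L'}^{\top}\Qb\,\Zb_{\kk}$ depends only on the \emph{variable} tokens in slots $3$ and $4$; the group-element token (slot $2$) never enters. A ``distractor'' predicate $(\pred,\ell)$ whose $g_\ell$ happens to satisfy $g_\ell(y_{L'})=y_{L'+1}$ therefore receives exactly the same off-diagonal score $[\Qb_{4,3}]_{\tau(x_{L'}),\tau(x_{\ell-1})}=\tilde O(1/d)$ as any other irrelevant predicate. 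The monotonicity-in-$L$ of $\epsilon_{\mathsf{attn}}$ is thus pointwise and requires no ``concentration bound on the number of strong distractors.''

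Second, there is a subtlety you overlook: while $\epsilon_{\mathsf{attn}}^{L,\ell}$ is monotone decreasing as $L$ shrinks, the gap $\Delta^{L,\ell}=|e^{[\Qb_{4,3}]_{s,s}}-e^{[\Qb_{4,4}]_{s,s}}|/D_{L,\ell}$ is \emph{anti}-monotone (smaller denominator, larger gap). So your sentence ``the FFN realizes the group operation correctly whenever the relevant attention mass exceeds the threshold'' is incomplete—you also need a uniform bound on $\Delta^{L,\ell}$. The fix is immediate: writing $a=e^{[\Qb_{4,3}]_{s,s}}$, $b=e^{[\Qb_{4,4}]_{s,s}}$, one has for every $L,\ell$ that
\[
\Delta^{L,\ell}\;\le\;\frac{|a-b|}{a+b}\;=\;\frac{\Delta^{L_K,2}}{1-\epsilon_{\mathsf{attn}}^{L_K,2}}\;\le\;\frac{c_1}{1-\tilde c},
\]
still a small constant; together with the $\epsilon_{\mathsf{attn}}$ bound this feeds directly into the logit estimate used in the paper's proof. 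Finally, the union bound over $K$ stages is unnecessary: the only randomness is in the Stage-1.1 initialization of $\Wb$, done once; all subsequent updates are deterministic, so the theorem at $k=K$ already carries a single $1-o(1)$ success probability.
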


\paragraph{Significance of the result.} Note that \(\{x_{\ell}\}_{\ell=0}^{L}\) is sampled from \(\cX\) \emph{without
replacement} (\Cref{assump:lego-data-distribution}), the longest feasible
chain scales with the variable size: \(L{+}1 \le |\cX| = \Theta(d)\).
Thus our guarantee attains the best possible length in this setting.
\Cref{cor:length-generalization-self} also demonstrates that the transformer can be trained to solve a
task beyond \(\mathsf{TC}^0\) with linear-step CoT, matching the expressivity
result of \cite{li2024chainthought}.\footnote{There are a few caveats.  For example, we do not analyze
an embedding dimension logarithmic in the problem length.  We believe our
techniques can be extended to cover this setting.}
While prior empirical work reports self-improvement in practice, theoretical
guarantees, especially for transformers and length generalization, have been scarce~\cite{huang2024self,sun2025theoretical,song2024mind}.
\Cref{thm:length-gen-self-training} provides, to our knowledge, the first
rigorous evidence that transformers can \emph{bootstrap} their reasoning via
self-training without additional supervision.

\section{Proof Overview}
\label{sec:overview-proof}

In this section, we outline the proof ideas for the main theorem.
Our training schemes in Algorithms~\ref{alg:cot-transitive-training} and
\ref{alg:cot-symmetry-training} alternate between two phases.
First, we train the FFN parameters $\Wb$ to solve the one-step task $\cT^1$.
Then, holding $\Wb$ fixed, we train the attention parameters $\Qb$ to solve
$\cT^2$ and, for symmetry group actions, recursively $\cT^{2^k}$.
This mirrors the division of labor in our setting (\Cref{fig:toy-example}): the FFN learns the
\emph{local update rule}, while the attention layer learns to \emph{route and
compose} these updates by locating relevant context over long sequences.

Guided by this picture, our proof overview proceeds in two parts: (1) learning the one-step mechanism for LEGO ($\cT^1$): in-context variable
retrieval (Section~\ref{sec-overview-in-context}) and group operations
(Section~\ref{sec-overview-group-operations}).
(2) learning the attention layer: direct short-to-long generalization on $\cT^2$
under simply transitive actions (Section~\ref{sec-overview-simply-transitive}),
and recursive length generalization via self-training on $\cT^{2^k}$ under
symmetry group actions (Section~\ref{sec-overview-symmetry}).

\paragraph{Notations}  Let us first define a few notations to facilitate the presentation of the proofs.
For each \(i\in[5]\), \(j\in[d]\), \(r\in[m]\),  define
\begin{align}
&\Lambda_{i,j,r}\!\left(\Zb^{L,\ell-1}\right)
\;\triangleq\;
\sum_{\kk\in\mathcal{I}^{L,\ell-1}}
\attn_{{\ans,\ell-1}\rightarrow \kk}\cdot
\big\langle \Wb_{i,j,r},\,\Zb_{\kk}\big\rangle
\;+\;b_{i,j,r}.
\label{eq-def-Lambda-main}
\end{align}
The quantity \(\Lambda_{i,j,r}\) is the FFN pre-activation,
i.e., the input to \(\mathbf{sReLU}\), for token position \(i\), vocabulary
index \(j\), and hidden unit \(r\). According to \eqref{eq-def-F-main-1}, given \(\Zb^{L,\ell-1}\), the model’s
output at token position \(i\) and vocabulary index \(j\) is
\begin{align}
\bigg[F_{i}\!\left(\Zb^{L,\ell-1}\right)\bigg]_j
\;=\;
\sum_{r\in[m]}\,\mathbf{sReLU}\big(\Lambda_{i,j,r}
\big(\Zb^{L,\ell-1}\big)\big).
\label{eq-def-F-main-2}
\end{align}
We denote $\Wb_i \triangleq \{\Wb_{i,j,r}\}_ 
{j\in[d],,r\in[m]}$ as the FFN parameters
associated with token position $i$.
Each $\Wb_{i,j,r}\in\RR^{5d}$ is written as a vertical concatenation
$\Wb_{i,j,r}=[\Wb_{i,j,r,1};\dots;\Wb_{i,j,r,5}]$, where
$\Wb_{i,j,r,i'}\in\RR^{d}$ for $i'\in[5]$; these five blocks align with the five
token-type inputs.

\subsection{Learning One-Step Reasoning}
For solving task $\cT^1$, in stage 1 (stage 1.1 for symmetry group task), we train $\Wb$ via the full loss $\Loss^{1}$, which contains the prediction loss across five tokens in the answer clause and $\Wb_i$ is responsible for predicting the $i$-th token in the answer clause. Thus, there are three different types of prediction tasks here: the $\blank$ tokens (tokens 1, 2, 3), the correct variable $x_1$ (token 4) and the action update $y_1=g_1(y_0)$ (token 5). Notice that the $\blank$ tokens are deterministic, the learning task is straightforward. We therefore focus on the learning dynamics for the 4th and 5th tokens, which involve in-context retrieval $(\Wb_4)$ and one-step group action $(\Wb_5)$. Across these stage, the attention layer is fixed and keeps as uniform attention due to the zero initialization of the attention matrix, i.e., the input for the FFN layer is $\frac{1}{2} \Zb_{\pred,1}+\frac{1}{2} \Zb_{\ans,0}$.
\subsubsection{Learning In-Context Retrieval of Variables}
\label{sec-overview-in-context}
Given the input \(\tfrac{1}{2}\Zb_{\pred,1}+\tfrac{1}{2}\Zb_{\ans,0}\) for the FFN layer,
the goal is to predict the fourth token in the answer clause \(\Zb_{\ans,1}\),
which is the variable \(x_1\).
Intuitively, the network should retrieve the occurrence of \(x_1\) in the first
predicate token \(\Zb_{\pred,1}\) and copy it to the target position.   Specifically, the FFN pre-activation for predicting the fourth token to be $j$ is 
\begin{align*}
\Lambda_{4,j,r}\!\left(\Zb^{1,0}\right)
&=\tfrac{1}{2}\,\big\langle \Wb_{4,j,r},\,\Zb_{\pred,1}\big\rangle
  +\tfrac{1}{2}\,\big\langle \Wb_{4,j,r},\,\Zb_{\ans,0}\big\rangle
  + b_{4,j,r}.
\end{align*}
Using the 5-vector decomposition of $\Wb_{4,j,r}$, and the fact that the embedding of $\blank$ tokens are zero vectors (recall \(e_{\tau(\cdot)}\) denote the token embedding vectors), we further obtain:
\begin{align}
  \Lambda_{4,j,r}\!\left(\Zb^{1,0}\right)
    &=\tfrac{1}{2}\Big(
  \textcolor{BrickRed}{\langle \Wb_{4,j,r,1},e_{\tau(x_1)}\rangle}
  +\langle \Wb_{4,j,r,2},e_{\tau(g_1)}\rangle
  +\langle \Wb_{4,j,r,3},e_{\tau(x_0)}\rangle
  \Big) \label{eq-def-Lambda-4-x1}\\
  &\quad+\tfrac{1}{2}\Big(
  \langle \Wb_{4,j,r,4},e_{\tau(x_0)}\rangle
  +\langle \Wb_{4,j,r,5},e_{\tau(y_0)}\rangle
  \Big)
  +b_{4,j,r}. \notag
  \end{align}
Therefore, the main idea of our analysis is to track the training dynamics of $\langle \Wb_{4,j,r,p}, e_{s'} \rangle$ for $j,s'\!\in[d]$, $p\!\in[5]$,
and $r\!\in[m]$. Letting $s\in \tau(\X)$ be an embedding index for a variable, our analysis shows 
the diagonal correlations  $\langle \Wb_{4,s,r,1}, e_{s} \rangle$ receive strictly larger updates than all other  $\langle \Wb_{4,j,r,p}, e_{s'} \rangle$. This occurs because \(x_1\) co-occurs simultaneously as the first input token
at \(\Zb_{\pred,1}\) and as the supervised target at \(\Zb_{\ans,1}\),
which amplifies the gradient on
\(\langle \Wb_{4,s,r,1},e_s\rangle\) when $s=\tau(x_1)$.  In contrast, non-target coordinates
(off-diagonals, wrong variables, value tokens, and group-action tokens)
incur negligible gradients and remain \(o(1)\) throughout training.
Hence the correct variable's diagonal signal becomes order-wise larger, and the \emph{active diagonal mass}
\(\sum_{r}\langle \Wb_{4,s,r,1},e_s\rangle\)
dominately grows until the end of training. 

Given the dominance above, the learned weights align so that
\(\Wb_{4,s,r,1}\) points toward \(e_s\) for $s\in\tau(\X)$.
Substituting this alignment into \eqref{eq-def-Lambda-4-x1} shows that,
when \(j=\tau(x_1)\), the red term
\(\langle \Wb_{4,\tau(x_1),r,1},e_{\tau(x_1)}\rangle\) contributes dominantly to
\(\Lambda_{4,\tau(x_1),r}(\Zb^{1,0})\),
thereby realizing the intended in-context retrieval:
the model copies \(x_1\) from the frist position of \(\Zb_{\pred,1}\) to the fourth position of
\(\Zb_{\ans,1}\).

\subsubsection{Learning the Group Actions}\label{sec-overview-group-operations}

For task \(\cT^1\), the FFN input \(\tfrac{1}{2}\Zb_{\pred,1}+\tfrac{1}{2}\Zb_{\ans,0}\)
already contains the current value \(y_0\) and the action \(g_1\), with no distracting
information.
Accordingly, when predicting the next value \(y_1\), the role of \(\Wb_5\) is to
correctly apply the action \(g_1\) to the current value \(y_0\).
In this section, we sketch how the model learns to implement simply transitive group
actions, and we briefly discuss the symmetry case.

We first introduce notation to explain what the model should learn.

\begin{definition}[Combinations, simply transitive actions]
Assuming the group \(\cG\) follows \Cref{assump:structure-1}, for each class index \(j\in\tau(\cY)\), define the \emph{combinations}
\[
\Phi:=\bigcup\nolimits_{j\in\tau(\cY)}\Phi^\star_j, \quad \text{where } \Phi^\star_j:=\{(g',y')\in\cG\times\cY:\ \tau(g'(y'))=j\}.
\]
$|\Phi|=n_y^2$ for the simply transitive case. We call \(\phi=(g,y)\in\Phi^\star_j\) a \emph{combination} for predicting \(j=\tau(g(y))\). Hence, the goal of the model is to correctly identify all of these  
\(\phi=(g,y)\in\Phi\).
\end{definition}

Analogously to \eqref{eq-def-Lambda-4-x1}, the pre-activation at the fifth token
for class \(j\) and neuron \(r\) is
\[
\Lambda_{5,j,r}(\Zb^{1,0})
=\tfrac{1}{2}\,\dbrack{\Wb_{5,j,r,2},e_{\tau(g_1)}}
+\tfrac{1}{2}\,\dbrack{\Wb_{5,j,r,5},e_{\tau(y_0)}}+\text{other terms},
\]
where slot ``2'' reads the action token \(g_1\) and slot ``5'' reads the current
value token \(y_0\).
We then define the following feature-magnitude notation:
\begin{definition}[\(V\)-Notations]\label{def:V-notations}
  Given \(\phi=(g,y)\in\Phi\), for the \(r\)-th neuron in the \(j\)-th class with
  \(j\in\tau(\cY)\) and \(r\in[m]\), i.e., \(\Wb_{5,j,r}\in\RR^{5d}\), we define
  \[
  V_{j,r}(g):=\dbrack{\Wb_{5,j,r,2},e_{g}},\quad
  V_{j,r}(y):=\dbrack{\Wb_{5,j,r,5},e_{y}},
  \]
  and the composite feature magnitude
  \[
  V_{j,r}(\phi):=\tfrac{1}{2}\big(V_{j,r}(g)+V_{j,r}(y)\big).
  \]
  \end{definition}
  \noindent Then \(V_{j,r}(\phi)\) is exactly the contribution of the input pair \((g,y)\) to
  \(\Lambda_{5,j,r}(\Zb^{1,0})\) for predicting \(j\).
  Notice that for each class \(j\), there are \(m\) associated neurons
  \(\{\Wb_{5,j,r}\}_{r=1}^m\), and 
  we index neurons by the pair \((j,r)\) to emphasize that neuron \(r\) is
  specific to class \(j\); an index \(r\) alone has no cross-class meaning in
  this context.

The main proof idea is to track \(V_{j,r}(\phi)\) throughout training and show
that it amplifies the correct correlations across the combinations \(\Phi\) while suppressing
the incorrect ones.
To make this concrete and to clarify the roles of different neurons, we
introduce the following neuron–feature index set:
\[
\Psi:=\{(j,r,\phi)\mid j\in\tau(\cY),\ r\in[m],\ \phi\in\Phi\}.
\]
Here, \((j,r)\) again refers to neuron \(r\) for class \(j\), as in
\Cref{def:V-notations}.
With this notation we can write \(V_{j,r}(\phi)\) as \(V_{\psi}\) for
\(\psi=(j,r,\phi)\in\Psi\).

Our proof shows that, given \(j\), for each \(\phi\in\Phi^\star_j\), there is
exactly one neuron \(r\) in \(\Wb_{5,j,r}\) that is activated, denoted
\(r_{g\cdot y}\), to learn  \(\phi\); that is,
\(V_{j,r_{g\cdot y}}(\phi)\) will grow to a large value. Therefore, in total \(n_y^2\) distinct neurons will be activated to learn all combinations, i.e.,
\( \{ (j,r_{g\cdot y},(g,y)), \forall j\in\tau(\cY),\ (g,y)\in\Phi^\star_j \}\). The magnitude of remaining $V_{\psi}$ with non-activated neurons will stay close to the initialization.
Our analysis shows that learning follows an \emph{implicit curriculum} induced by the magnitude of features \(V_{\psi}\) at initialization:
\[
\psi\prec\psi'\iff V_{\psi}^{(0)}\ge V_{\psi'}^{(0)},\quad\forall\psi,\psi'\in\Psi.
\]
Items on the left under this ordering are learned first, and those on the right
are learned later.  We denote by \(\Sigma^{\star}\) the \emph{learning curriculum}: the ordered set
of neuron–feature indices \(\{(j, r_{g\cdot y}, (g,y))\}\) identified above,
equipped with this order.

Then, for each \(\psi\in\Sigma^{\star}\), the learning process follows its associated
ordering, and when it is the turn of \(\psi=(j,r_{g\cdot y},(g,y))\), the
learning process is mainly characterized by the following two phases:
\begin{itemize}[itemsep=1pt,leftmargin=1.5em]
  \item {\bf Phase I:} Emergence of the feature \(V_{\psi}\) among other
features.
During this phase, \(V_{\psi}\) grows faster than any \(\psi'\neq\psi\) with
\(\psi\prec\psi'\), while not affecting the already learned predecessors
\(\psi'\in\Sigma^{\star}\) with \(\psi'\prec\psi\), with growth rate
\begin{align}\label{eq:phase-1-tpm-overview}
V_{\psi}^{(t+1)}\ge V_{\psi}^{(t)}+\tOmega(\eta)\cdot \big(b_{j,r}
+V_{\psi}^{(t)}\pm o(\mu)\big)^{q-1}.
\end{align}
This form permits the application of the tensor power method (TPM)
\cite{allen2020towards} and explains the ordering induced by the magnitude of
\(V_{\psi}\) at initialization, since TPM implies that a slightly larger
initial value leads to dramatically faster growth.
By the end of Phase I, for any other feature \(\psi'\neq\psi\) with
\(\psi\prec\psi'\), the growth of
that feature is capped at \(\tO(\sigma_0)\), its initial magnitude.
  \item {\bf Phase II:} Growth of \(V_{\psi}\) and cancellation of incorrect
  features. After Phase I, the target feature \(V_{\psi}\) already has a relatively large
  magnitude; however, features in the set
  \(\{\psi'=(j,r_{g\cdot y},\phi')\in\Psi\mid
  \phi'=(g',y)\ \text{or}\ \phi'=(g,y')\}\), namely, the features in the same
  neuron \((j,r_{g\cdot y})\) that share exactly one component of
  \(\phi=(g,y)\), may also grow as the shared component increases.
  Note that \(j\) is not the correct label for such \(\phi'\), and
  we call them \textbf{confounding features}.
  The key characterization in this phase is that, due to a stationarity property
  of the gradients, although the dynamics are coupled, the wrong half of the
  confounding features grows in the negative direction while the correct half
  continues to grow, and ultimately the confounding feature cancels out.  
\end{itemize}
We show that for each \(\psi\in\Sigma^{\star}\) that should be learned, it
retains its structure after its own learning process (i.e., at the end of
Phase II) and persists through the final convergence while the other features
in \(\Sigma^{\star}\) are learned.
Specifically, we have the following properties at the end of training:
\begin{subequations}
  \label{eq:properties-at-the-end-of-training-cyc}
\begin{align}
&V_{j,r_{g\cdot y}}(g),\ V_{j,r_{g\cdot y}}(y)\ \approx\ B,\\
&V_{j,r_{g\cdot y}}(g'),\ V_{j,r_{g\cdot y}}(y')\ \approx\ -B,\\
&V_{j,r_{g\cdot y}}(g)+V_{j,r_{g\cdot y}}(y')\ \le\ o(1)\ \text{ and }\
V_{j,r_{g\cdot y}}(g')+V_{j,r_{g\cdot y}}(y)\ \le\ o(1)
\quad \forall\ g'\neq g,\ y'\neq y. 
\end{align}
\end{subequations}
\paragraph{How does this structure perform group actions?}
For an input pair \((g,y)\), let the correct answer be \(j=\tau(g(y))\).
Then the pre-activation for predicting $j$ is around \(B\) at neuron \(r_{g\cdot y}\), with all
other neurons for \(j\) near the initial value \(\tilde{O}(\sigma_0)\).
Thus the model output at \(j\) is around \(B\).
For all other predictions \(j'\), there exist
\(\phi_1=(g',y)\) and \(\phi_2=(g,y')\) such that
\(j'=\tau(g'(y))=\tau(g(y'))\).
By cancellation of the incorrect features, we have
\(V_{j',r_{g'\cdot y}}(\phi_1),\ V_{j',r_{g\cdot y'}}(\phi_2)\ \le\ o(1)\).
Since \(B=\Theta(\log d)\), this implies that the logit on the correct
prediction \(j\) is very close to \(1\).

  \paragraph{Symmetry Group Actions.}
  The proof strategy for symmetry group actions mirrors the simply transitive case
  through the emergence, refinement, and convergence phases.
  However, symmetry actions create richer interactions because multiple group
  elements can map the same \(y\) to the same \(j\), which requires more nuanced
  control of the training dynamics and leads to different learned feature
  structures.
  To illustrate this pattern, we slightly modify the definition of
  \textbf{combinations} used in the simply transitive case by introducing the
  notion of the \textbf{fiber} of a value.
 
  \begin{definition}[Combinations, Symmetry Actions]
  Assuming the group \(\cG\) follows \Cref{assump:structure-2}, define
    \(\fiber_{j,y}:=\{g\in\cG\mid \tau(g(y))=j\}\).\footnote{We use the notion of
    fiber to denote the left cosets \(gG_y\subset\cG\), \(g\in\cG\), where
    \(y\in\cY\) and \(G_y=\{g\in\cG\mid g(y)=y\}\) is the stabilizer.
    Since the fiber of the orbit map \(f_y(g)=g(y)\) is the preimage of \(f_y\),
    i.e., \(f_y^{-1}(y')\) with \(y'=\tau^{-1}(j)\), \(j\in\tau(\cY)\), this is
    exactly the set \(\fiber_{j,y}\).} 
    This allows us to define the combinations as
    \[
    \Phi=\{\varphi_{j,y}\mid j\in\tau(\cY),\ y\in\cY\},\quad
    \text{ where }\ \varphi_{j,y}=\fiber_{j,y}\times\{y\}.
    \]
    Moreover, \(|\varphi_{j,y}|=(n_y-1)!\).
    We continue to call \(\phi=(g,y)\in\cG\times\cY\) a combination, whereas an
    element of \(\Phi\) is now a subset of combinations \(\varphi_{j,y}\) (there
    are \(n_y^2\) such \(\varphi_{j,y}\) in total), which includes all pairs
    \((g,y)\) such that \(g\) sends \(y\) to \(j\), and which reduces to the single
    pair \((g,y)\) in the simply transitive case.
    \end{definition}  
    Based on these notions, the main difference from the simply transitive case is
    that previously the basic learning unit is each pair \((g,y)\in\Phi\), whereas
    now it is the subset \(\varphi_{j,y}\) of combinations. All combinations \(\phi\) within \(\varphi_{j,y}\) are captured by the same and
    unique neuron, denoted \(r_{j,y}\), in the sense that the feature magnitude
    \(V_{j,r_{j,y}}(\phi)\) is large for any \(\phi\in\varphi_{j,y}\).
    Accordingly, the learning curriculum \(\Sigma^{\star}\) is now based on the
    initial value of the ensemble of feature magnitudes:
    \(\frac{1}{|\varphi_{j,y}|}\sum_{\phi\in\varphi_{j,y}} V_{j,r_{j,y}}(\phi)\).
    Finally, at convergence, we have the following imbalance of feature
    magnitudes:
    \begin{subequations}
      \begin{align}
      &V_{j,r_{j,y}}(g)\ \approx\ 2B,\qquad
        V_{j,r_{j,y}}(y)\ \approx\ \frac{2B}{n_y}
        \quad \forall\, g\in \fiber_{j,y};\\
      &V_{j,r_{j,y}}(g')\ \approx\ \frac{2B}{n_y},\qquad
        V_{j,r_{j,y}}(y')\ \approx\ -\,2B;\\
      &V_{j,r_{j,y}}(g)+V_{j,r_{j,y}}(y')\ \le\ o(1)\ \text{ and }\
        V_{j,r_{j,y}}(g')+V_{j,r_{j,y}}(y)\ \le\ o(1)
        \quad \forall\, g'\notin\fiber_{j,y},\ y'\neq y.
      \end{align}
      \end{subequations}

\subsubsection{What Changes for Longer Tasks?}
\paragraph{FFN is length invariant.}
The feed-forward network (with weights \(\Wb\)) only acts on the \emph{attended
linear combination} at the current output clause,
rather than scanning the sequence to retrieve information.
Retrieval from the context is delegated to the attention layer.
Hence, as sequence length grows and positions shift,
the FFN computation remains the same mapping on its local input,
making it length- and position-invariant. On the other hand, uniform attention becomes increasingly diluted as irrelevant clauses accumulate,
so selective  attention and learning attention layer are required.

\paragraph{Desired attention patterns for predicting each tokens.} To predict the fourth token, the model only needs the variable
\(x_{\ell+1}\) from the predicate clause \(\Zb_{\pred,\ell+1}\).
If the attention mass on \(\Zb_{\pred,\ell+1}\) does not vanish with \(L\),
the same pattern learned in \(\cT^1\) with \(\Wb_4\) applies directly. Predicting the fifth token is harder because it requires \emph{two} retrievals:
(i) the group action from \(\Zb_{\pred,\ell+1}\), and
(ii) the current value from \(\Zb_{\ans,\ell}\),
followed by applying the update via \(\Wb_5\).
Thus robustness over length hinges on maintaining both attention links. If the attention layer is robust enough to support the fifth-token prediction,
then the fourth-token prediction follows as a special case, since it needs only
one of the two attention links to persist.
Accordingly, in what follows we focus on optimizing the loss for the fifth
token; achieving high accuracy there is effectively equivalent to solving the
entire task.

\subsection{Learning the Attention Layer}\label{sec:overview-attention}

Successful training on $\cT^{1}$ shows that the model has learned the one-step
update $g\circ y$. Turning to task $\cT^{2}$, the 
remaining difficulty is \emph{routing}: directing attention to the appropriate
locations. Thus we train the attention matrix $\Qb$ to learn the routing pattern and keep $\Wb$ fixed. In this part, we will show how training induces the  
\textbf{attention concentration} pattern highlighted in \Cref{thm:length-generalization,thm:length-gen-self-training}: given an input $\Zb^{L,\ell-1}$, the
attention mass concentrates on
$\attn_{\ans,\ell-1\to\pred,\ell}$ and
$\attn_{\ans,\ell-1\to\ans,\ell-1}$. 

We quantify routing quality via the \emph{attention concentration degree}
\begin{align}
  \epsilon^{L,\ell}_{\mathsf{attn}}\!\left(\Zb^{L,\ell-1}\right)
  \;=\;
  1
  - \attn_{\ans,\ell-1\to\pred,\ell}\!\left(\Zb^{L,\ell-1}\right)
  - \attn_{\ans,\ell-1\to\ans,\ell-1}\!\left(\Zb^{L,\ell-1}\right),
  \label{eq-attention-concentrate}
\end{align}
which measures the fraction of attention mass \emph{not} placed on the two key
clauses, and the \emph{attention gap}
\begin{align}
  \Delta^{L,\ell}\!\left(\Zb^{L,\ell-1}\right)
  \;=\;
  \Big|
  \attn_{\ans,\ell-1\to\pred,\ell}\!\left(\Zb^{L,\ell-1}\right)
  - \attn_{\ans,\ell-1\to\ans,\ell-1}\!\left(\Zb^{L,\ell-1}\right)
  \Big|,
  \label{eq:attention-gap}
\end{align}
which captures how balanced the two target attentions are.
Thus, effective routing corresponds to small $\epsilon^{L,\ell}_{\mathsf{attn}}$
(high concentration) and small $\Delta^{L,\ell}$ (good balance).

Under \Cref{assump-Q-structure}, for an input $\Zb^{L,\ell-1}$ the
(unnormalized) attention score from   $\Zb_{\ans,\ell-1}$ to
a clause $\Zb_{\kk}$ decomposes as
\[
  \Zb_{\ans,\ell-1}^{\top}\Qb\Zb_{\kk}
  \;=\;
  \Zb_{\ans,\ell-1,4}^{\top}\Qb_{4,3}\Zb_{\kk,3}
  \;+\;
  \Zb_{\ans,\ell-1,4}^{\top}\Qb_{4,4}\Zb_{\kk,4},
\]
where $\Zb_{\kk}=[\Zb_{\kk,1},\ldots,\Zb_{\kk,5}]$ with
$\Zb_{\kk,i}\in\RR^d$.
By design, in the clause embeddings the \emph{fourth} token of a \emph{predicate}
clause and the \emph{third} token of an \emph{answer} clause are $\blank$. Consequently, we further have
\begin{align}
 & \Zb_{\ans,\ell-1}^{\top}\Qb\Zb_{\pred,\ell'} =  \Zb_{\ans,\ell-1,4}^{\top}\Qb_{4,3}\Zb_{\pred,\ell',3},\qquad \Zb_{\ans,\ell-1}^{\top}\Qb\Zb_{\ans,\ell'} =  \Zb_{\ans,\ell-1,4}^{\top}\Qb_{4,4}\Zb_{\ans,\ell',4}, \label{eq:Q-role}
\end{align}
which means $\Qb_{4,3}$ governs attention to predicate clauses, while
$\Qb_{4,4}$ governs attention to answer clauses. An immediate observation from \eqref{eq:Q-role} is that the desired allocation can be realized by growing the diagonal entries $[\Qb_{4,3}]_{s,s}$ and $[\Qb_{4,4}]_{s,s}$\footnote{$[\Ab]_{s,s'}$ denotes the entry of the matrix $\Ab$ at the $s$-th row and $s'$-th column} for
$s\in\tau(\cX)$ since $x_{\ell-1}$ appears as the third token in $\Zb_{\pred,\ell}$ and as the fourth token in $\Zb_{\ans,\ell-1}$ simultaneously. Our analysis of tracking the attention dynamics will show that these diagonal coordinates indeed receive asymptotically larger gradient
magnitudes than all other entries due to strong co-occurrence signal, and the training dynamics are dominated by their growth. See \Cref{fig:Q-example} for an illustration.

\begin{figure}[t]
  \centering
\includegraphics[width=.85\textwidth]{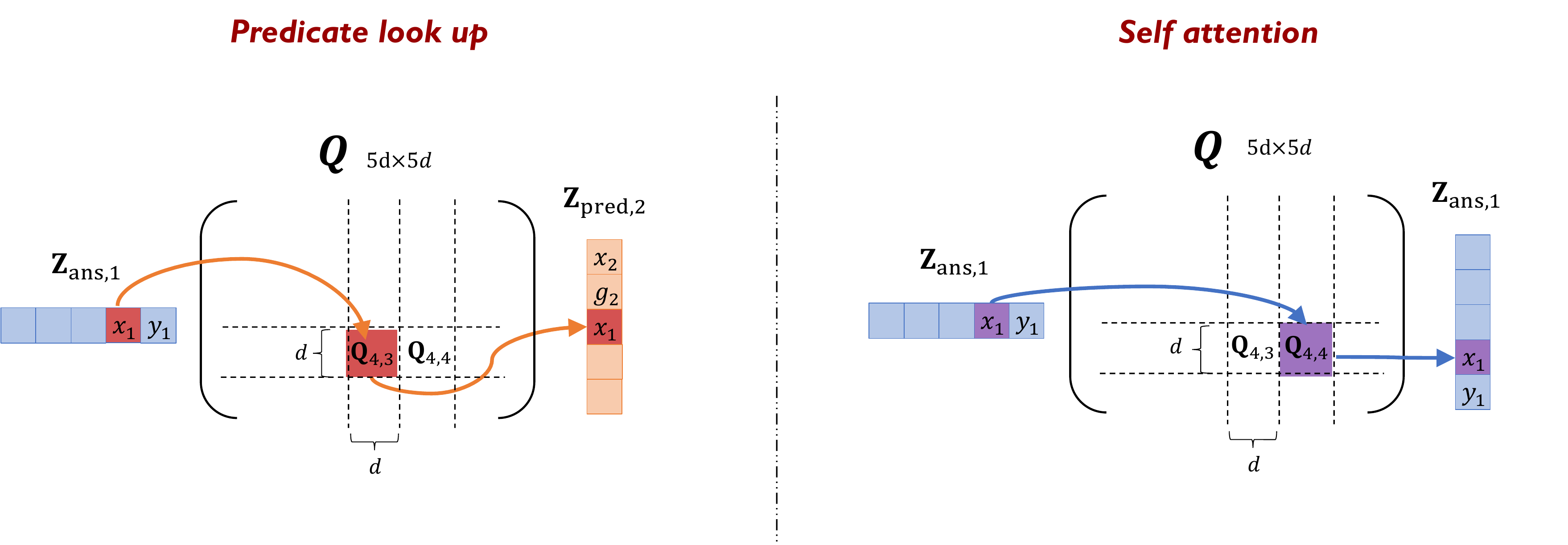}
  \caption{The illustration of how different components of the attention matrix $\Qb$ are used to route the attention to the appropriate locations. The query clause is $\Zb_{\ans,1}$ and the goal is to retrieve the correct action $g_2$ from $\Zb_{\pred,2}$  and value $y_1$ from the current answer clause $\Zb_{\ans,1}$. $\big[\Qb_{4,p}\big]_{s,s}$ will grow and dominate the learning dynamics for $p\in\{3,4\}$ and $s\in\tau(\cX)$. Thus, in this example, large $\big[\Qb_{4,3}\big]_{\tau(x_1),\tau(x_1)}$ indicates the large attention to the predicate clause $\Zb_{\pred,2}$ and large $\big[\Qb_{4,4}\big]_{\tau(x_1),\tau(x_1)}$ indicates the large self-attention to the answer clause $\Zb_{\ans,1}$.}
  \label{fig:Q-example}
\end{figure}

For notational simplicity, we will refer to the relevant diagonal entries
$[\Qb_{4,3}]_{s,s}$ and $[\Qb_{4,4}]_{s,s}$ simply as $\Qb_{4,3}$ and
$\Qb_{4,4}$ below. The remainder of the proof quantifies how the growth of $\Qb_{4,3}$ and
$\Qb_{4,4}$ simultaneously drives the concentration degree
$\epsilon^{L,\ell}_{\mathsf{attn}}$ toward zero and controls the gap
$\Delta^{L,\ell}$, ensuring that the FFN consistently receives
$(g_\ell,\,y_{\ell-1})$ and thus outputs the correct $y_\ell$.

\subsubsection{Simply Transitive Actions}\label{sec-overview-simply-transitive}
For the simply transitive case, we analyze the gradient contribution at position
$i=5$ on task $\cT^2$, i.e., the loss $\sum_{\ell=1}^{2}\Loss_{5}^{2,\ell}$.
We show that for $\cT^2$ the \emph{attention concentration degree}
$\epsilon^{2,\ell}_{\mathsf{attn}}$ (for $\ell\in[2]$) can be reduced below
$O(1/d^{c^{*}})$ for some constant $0<c^{*}<1$, indicating highly focused mass on the relevant clauses.
When irrelevant entries of $\Qb$ are small, we also have
$\epsilon^{2,1}_{\mathsf{attn}} \le \epsilon^{2,2}_{\mathsf{attn}}$, since the
number of irrelevant clauses doubles from $\ell=1$ to $\ell=2$; hence we focus
on controlling $\epsilon^{2,2}_{\mathsf{attn}}$.

\begin{itemize}[itemsep=1pt,leftmargin=1.5em]
    \item \textbf{Stage 2.1: Growth of an initial gap.}
          Early in training, attention is close to uniform, so given
          $\Zb^{2,\ell-1}$ we have the approximations
          \[
            \Lambda_{5,j,r}(\Zb^{2,0})
            \;\approx\; \tfrac13\,V_{j,r}(g_{1})
            + \tfrac13\,V_{j,r}(g_{2})
            + \tfrac13\,V_{j,r}(y_{0})
            \;,
          \]
          \[
            \Lambda_{5,j,r}(\Zb^{2,1})
            \;\approx\; \tfrac14\,V_{j,r}(g_{1})
            + \tfrac14\,V_{j,r}(g_{2})
            + \tfrac14\,V_{j,r}(y_{0})
            + \tfrac14\,V_{j,r}(y_{1})
            \;.
          \]
          By the cancellation at the convergence in \eqref{eq:properties-at-the-end-of-training-cyc}, for $\ell=2$ all
          $\Lambda$ lie in the small smoothed regime, whereas for $\ell=1$ we
          obtain a correct logit for $y_1=g_1(y_0)$ and a spurious logit for
          $g_2(y_0)$ of magnitude about $B/3$.
          Consequently, $-\nabla_{\Qb}\Loss^{2,2}_5$ is negligible, while
          $-\nabla_{\Qb}\Loss^{2,1}_5$ is comparatively large and drives
          $\Qb_{4,3}$ to grow faster than $\Qb_{4,4}$ (increasing $\Qb_{4,4}$
          would also amplify the wrong prediction $\tau(g_2(y_0))$ and thus not
          reduce $\Loss^{2,1}_5$).
          An $\Omega(1/\log d)$ gap emerges between the diagonals
          $[\Qb_{4,3}]_{s,s}$ and $[\Qb_{4,4}]_{s,s}$, yielding an early routing
          advantage toward right predicate clause.
  
    \item \textbf{Stage 2.2: Joint growth with a controlled gap.}
          As $\Qb_{4,3}$ increases, the weight $\attn_{\ans,1\to\pred,2}$ becomes large, moving
          $\Lambda_{5,\tau(g_2(y_1)),r_{g_2\cdot y_1}}$ and
          $\Lambda_{5,\tau(g_2(y_0)),r_{g_2\cdot y_0}}$ for $\ell=2$ into the
          linear regime.
          Gradients from $\ell=2$ then dominate, and $\Qb_{4,4}$ starts to grow
          to separate the correct $y_2=g_2(y_1)$ from the incorrect
          $\tau(g_2(y_0))$.
          Throughout, the gap between $\Qb_{4,3}$ and $\Qb_{4,4}$ stays within
          $[\Omega(1/\log d),\,O(1)]$, so the attention gap satisfies
          $\Delta^{2,2}=\Omega(1/\log d)$.
  
    \item \textbf{Stage 2.3: Convergence and gap reduction.}
          Continued joint growth of $\Qb_{4,3}$ and $\Qb_{4,4}$ concentrates
          attention near its ideal limit, making $\epsilon^{2,2}_{\mathsf{attn}}$
          small.
          We show $\Delta^{2,2}$ cannot remain above $o(1)$ for long; otherwise an
          incorrect logit  $\logit_{5,\tau(g_2(y_0))}$ would acquire a
          stronger gradient and force $\Qb_{4,4}$ to outpace $\Qb_{4,3}$, which
          contradicts stability. Therefore, at convergence we have
(i) $\epsilon^{2,2}_{\mathsf{attn}} = O(d^{-c^*})$ for some $c^*\in(0,1)$ and
(ii) $\Delta^{2,2} = o(1)$; hence both $\Qb_{4,3}$ and $\Qb_{4,4}$ equal
$C\log d \pm o(1)$, while all other entries of the attention matrix remain
close to their initial values, where $C>0$ depends on $c^*$.
  \end{itemize}

\paragraph{Strong length generalization.}
The key is that attention concentrates cleanly while remaining stably balanced.
For \(\cT^L\), we obtain
\[
  \epsilon^{L,\ell}
  \;\le\;
  \frac{O(1)\cdot L}{\,O(1)\cdot L + 1/\epsilon^{2,2}_{\mathsf{attn}}\,}
  \;\le\;
  \frac{1}{1+\Omega(d^{c^{*}}/L)}.
\]
Hence the model tolerates \(O(d^{c^*})\) irrelevant clauses: in particular,
if \(L=o(d^{c^{*}})\) then \(\epsilon^{L,\ell}=o(1)\); and if
\(L=\Theta(d^{c^{*}})\) then
\(\epsilon^{L,\ell}\le \frac{1}{1+\Omega(1)}\), which can be made small by choosing the
proportionality constant in \(L=\Theta(d^{c^{*}})\) appropriately. 
Moreover, \(\Delta^{L,\ell}\le \Delta^{2,2}\le o(1)\), so the two target
attention masses remain balanced as \(L\) grows, preventing errors from large
imbalance (e.g., predicting \(\tau(g_2(y_0))\) in Stage~2.3). Consequently,
the correct logit satisfies $1 - \logit_{5,\tau(g_{\ell+1}(y_\ell))}
  \;\le\; \frac{1}{\poly(d)}$, 
so \(\cT^L\) is solved with accuracy \(1 - 1/\poly(d)\).

\subsubsection{Symmetry Group Actions}\label{sec-overview-symmetry}
We now turn to symmetry group tasks $\cT^{L}$ and analyze GD updates with
respect to the per-token loss $\Loss_{5}^{L,2}$ (i.e., predicting the value
token in $\Zb_{\ans,2}$ from $\Zb^{L,1}$).

\paragraph{The case $L=2$.}
The high-level picture mirrors the simply transitive case, but because multiple
group elements can map a given $y$ to the same $j$, the learned $V_{j,r}(g)$ and $V_{j,r}(y)$ structures are
unbalanced.  Moreover, as we discussed in the hardness part for the symmetry group in \Cref{sec:symmetry}, there will be a non-negligible proportions of distractor clauses.  These make it harder to keep a tight balance between $\Qb_{4,3}$ and
$\Qb_{4,4}$ as in the simply transitive case.
Nevertheless, we prove that both $\Qb_{4,3}$ and $\Qb_{4,4}$ grow, and the
attention gap $\Delta^{2,2}$ is controlled by a feedback mechanism: if
$\Delta^{2,2}$ exceeds a small fixed threshold (in either direction), some
incorrect logit receives a stronger gradient, which pushes the system back
toward balance.
Consequently, after sufficient training:
(i) $\epsilon^{2,2}_{\mathsf{attn}}\le C_1$;
(ii) $\Delta^{2,2}\le C_2$,
for sufficiently small constants $C_1,C_2$; and since $B=\Theta(\log d)$, we
still obtain $\Loss^{2,2}_5\le 1/\poly(d)$.

\paragraph{Recursive learning for $\cT^{2^k}$, $k\ge 2$.}
Because $\epsilon^{2^{k-1},2}_{\mathsf{attn}}$ is already a small constant,
initialization for $\cT^{2^k}$ satisfies
$\epsilon^{2^k,2}_{\mathsf{attn}}\le 2\,\epsilon^{2^{k-1},2}_{\mathsf{attn}}$
(still small), and $\Delta^{2^k,2}\le \Delta^{2^{k-1},2}$.
Thus the attention pattern remains close to that in $\cT^{2^{k-1}}$, and
$\Zb^{2^k,1}$ follows a bootstrapped LEGO distribution generated by the greedy
model $\hat{p}_{F^{(T_{k-1})}}$, which coincides with the original LEGO source.
In particular, $y_1=g_1(y_0)$ and $y_2=g_2(y_1)$ are correct.
We can therefore reuse the convergence analysis from $\cT^{2^{k-1}}$ to show
that both $\epsilon^{2^k,2}_{\mathsf{attn}}$ and $\Delta^{2^k,2}$ decrease to a
small constant, yielding stable, inductive concentration across recursive
reasoning depths.

\subsection{Significance of the Proof}\label{sec:overview-significance}
Our proof techniques are inspired by recent advances in understanding the
dynamics of feature learning in neural networks~\cite{allen2021forward,allen2022feature,wen2021toward,wen2022mechanism,huang2022modality,jelassi2022vision,huang2023context,huang2025a},
which show how gradient-based training induces useful internal patterns and
representations.
Building on these ideas, we analyze the dynamics of the FFN and attention
layers to capture how transformers gradually acquire length-generalizable
reasoning through CoT training.
We conclude this section by summarizing the technical significance of our
analysis for learning in the FFN and attention layers.

\paragraph{Learning the group actions.}
Most existing optimization-based analyses of CoT~\cite{huang2025transformers,wen2025sparse,kim2025transformersprovably, huang2025transformerslearn} hard-code the group action, 
e.g., parity, as a fixed FFN or an almost-linear map.
In contrast, we \emph{train} a nonlinear FFN end-to-end to perform the group
action at each step, with parity as a special \(n=2\) instance of our
simply transitive framework. This formulation is strictly more general and substantially more challenging
than prior setups. Our proof shows that the model learns not only the basic action for simply transitive groups but also more complex actions for richer actions that is transitive but not free. Especially, we precisely characterizing how task-relevant features emerge and spurious features are suppressed during the process. The technique we used in the proof for learning the FFN on discrete combinations of data is of independent interest beyond analyzing CoT.

\paragraph{Learning the attention patterns.}
Prior work on transformer training dynamics has established the
\emph{attention concentration} principle as a key mechanism for solving
various tasks, e.g., in-context learning~\cite{huang2023context},
self-supervised learning~\cite{huang2025a}, and graph
learning~\cite{nichani2024transformers}.
In those settings, only a single token needs to be retrieved, so the attention
matrix is learned to be diagonal (after a suitable change of basis), and a
single pattern suffices.
In contrast, our task requires retrieving two different types of clauses, so
the model must learn two distinct components (blocks) in the attention matrix,
\(\Qb_{4,3}\) and \(\Qb_{4,4}\).
As we show in \Cref{sec:overview-attention}, beyond growing these blocks, we
must maintain a delicate balance between them.
This balance is crucial for length generalization across different group
actions, introduces substantial technical challenges, and our proof provides
a fine-grained control of it.

\section{Experiments}\label{sec:exp}

In this section, we conduct synthetic experiments to verify our theoretical claim.

\paragraph{General Setup.}  We adopt an experimental setting closely aligned with our theoretical setup.
Data strictly follow the LEGO distribution in the five-token-per-clause format
defined in \Cref{def:lego-encoding} and \Cref{assump:lego-data-distribution}.
The simply transitive task uses the cyclic group of order $6$ (denoted $C_6$), and
the symmetry task uses the symmetry group on five elements (denoted $S_5$).
The network is a one-layer, decoder-only transformer with two attention heads and
a FFN block.
Training optimizes the next-clause loss in \eqref{eq:next-clause-loss-def} using
Adam~\cite{Kingma2014AdamAM} with a learning rate of $1$e-$4$.
We train for 300 epochs to ensure the training loss approaches zero and the model
converges. For evaluation, to measure the transformer's ability to solve CoT reasoning tasks,
we eschew the teacher-forced accuracy in \eqref{eq:acc-L-def}, which conditions on
the ground-truth answer prefix, and instead evaluate final-answer accuracy after
autoregressively generating all intermediate answer steps without teacher forcing.
This better mimics CoT reasoning and is more challenging because errors can
accumulate during self-rollout.
For computational efficiency, we report the probability that the model predicts the value of all
answer steps correctly, following \cite{li2024chainthought}.

\paragraph{Length generalization for different group actions.}
\Cref{fig:main-results-a} shows that when trained at the short length $L=5$, the model
exhibits strong length generalization, achieving near-perfect accuracy at much
longer lengths, for example, up to $L=160$ on the simply transitive task, which
corroborates \Cref{thm:length-generalization}.
By contrast, on the harder symmetry task the generalization is weaker, yielding
only constant-factor extensions as expected.
\begin{figure}[t]
    \centering
    \begin{subfigure}[t]{0.3\textwidth}
        \centering
\includegraphics[width=\linewidth]{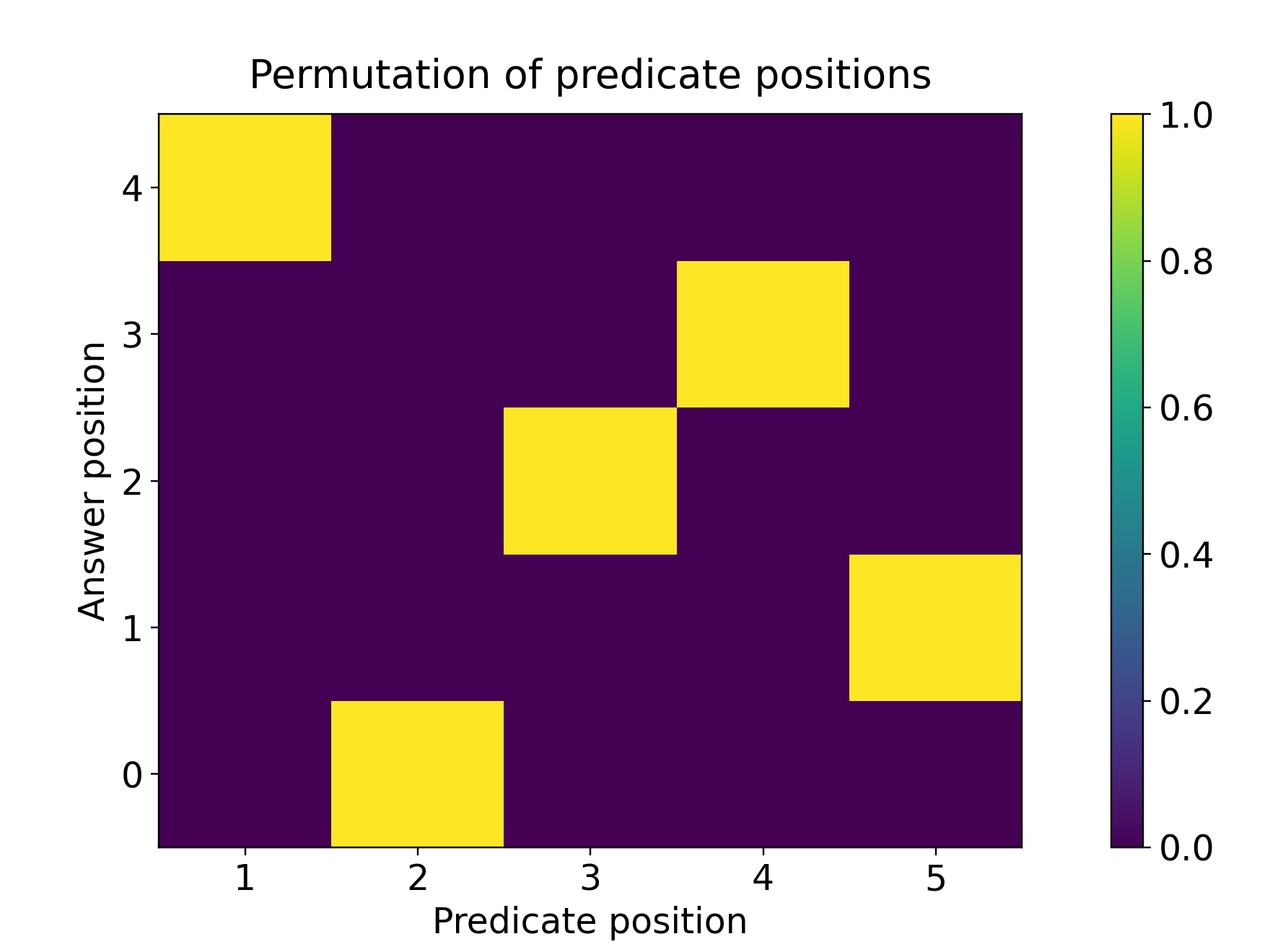}
    \end{subfigure}
    \hfill
    \begin{subfigure}[t]{0.3\textwidth}
        \centering
\includegraphics[width=\linewidth]{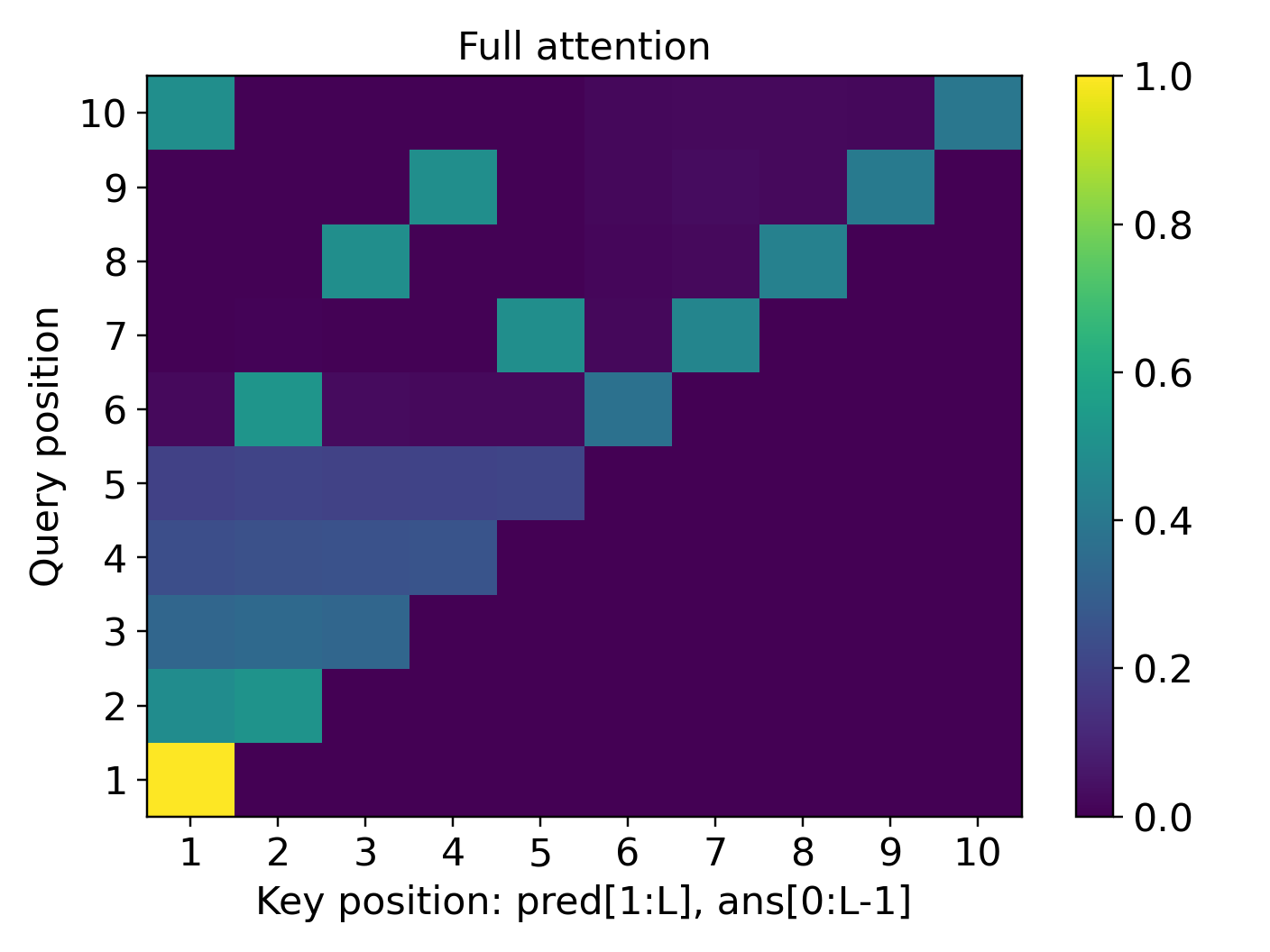}
    \end{subfigure}
    \hfill
    \begin{subfigure}[t]{0.3\textwidth}
        \centering
\includegraphics[width=\linewidth]{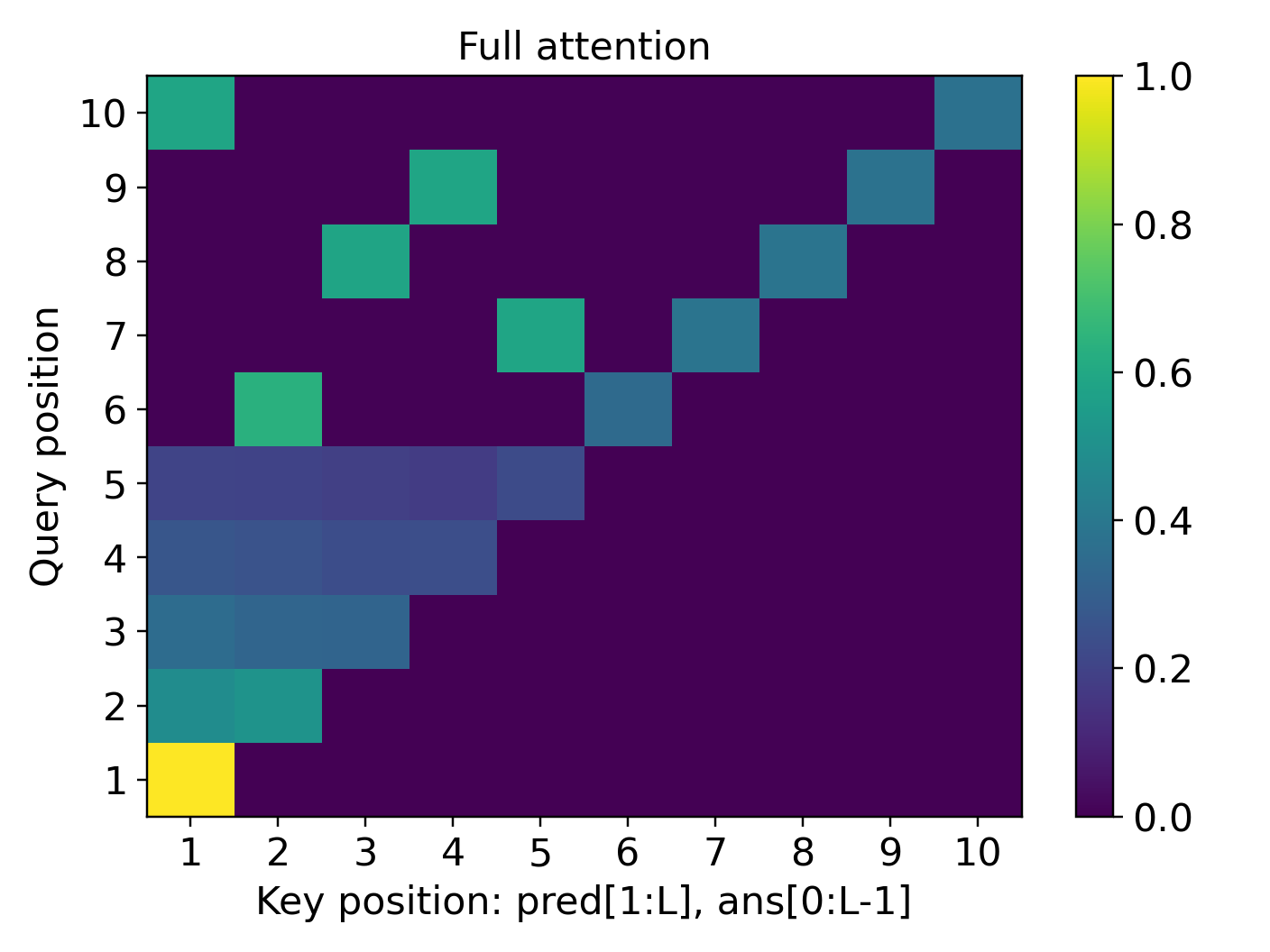}
    \end{subfigure}
    \vspace{0.3cm}
    \begin{subfigure}[t]{0.3\textwidth}
        \centering
\includegraphics[width=\linewidth]{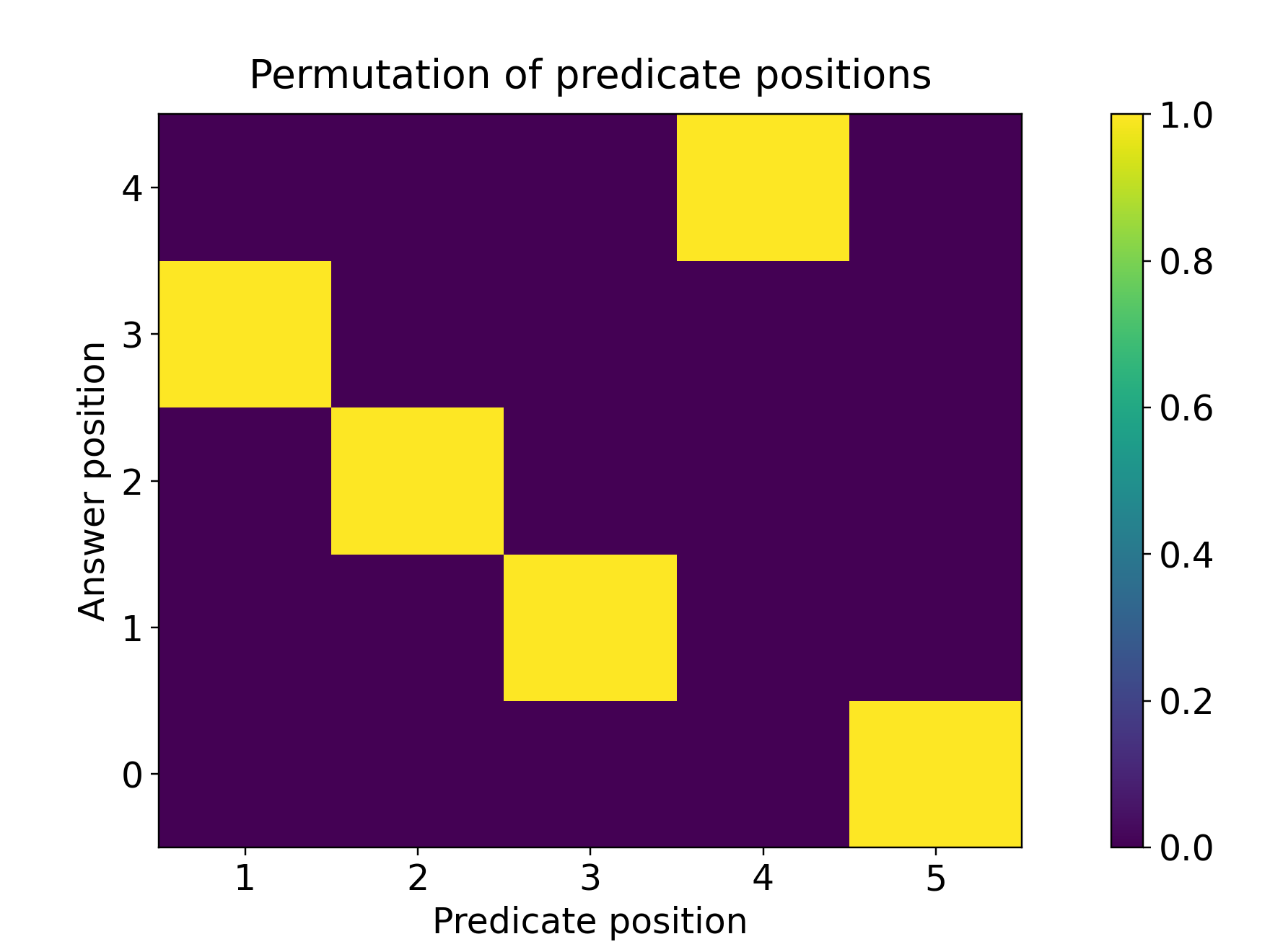}
\caption{\centering  Ground-truth permutation }
    \end{subfigure}
    \hfill
    \begin{subfigure}[t]{0.3\textwidth}
        \centering
\includegraphics[width=\linewidth]{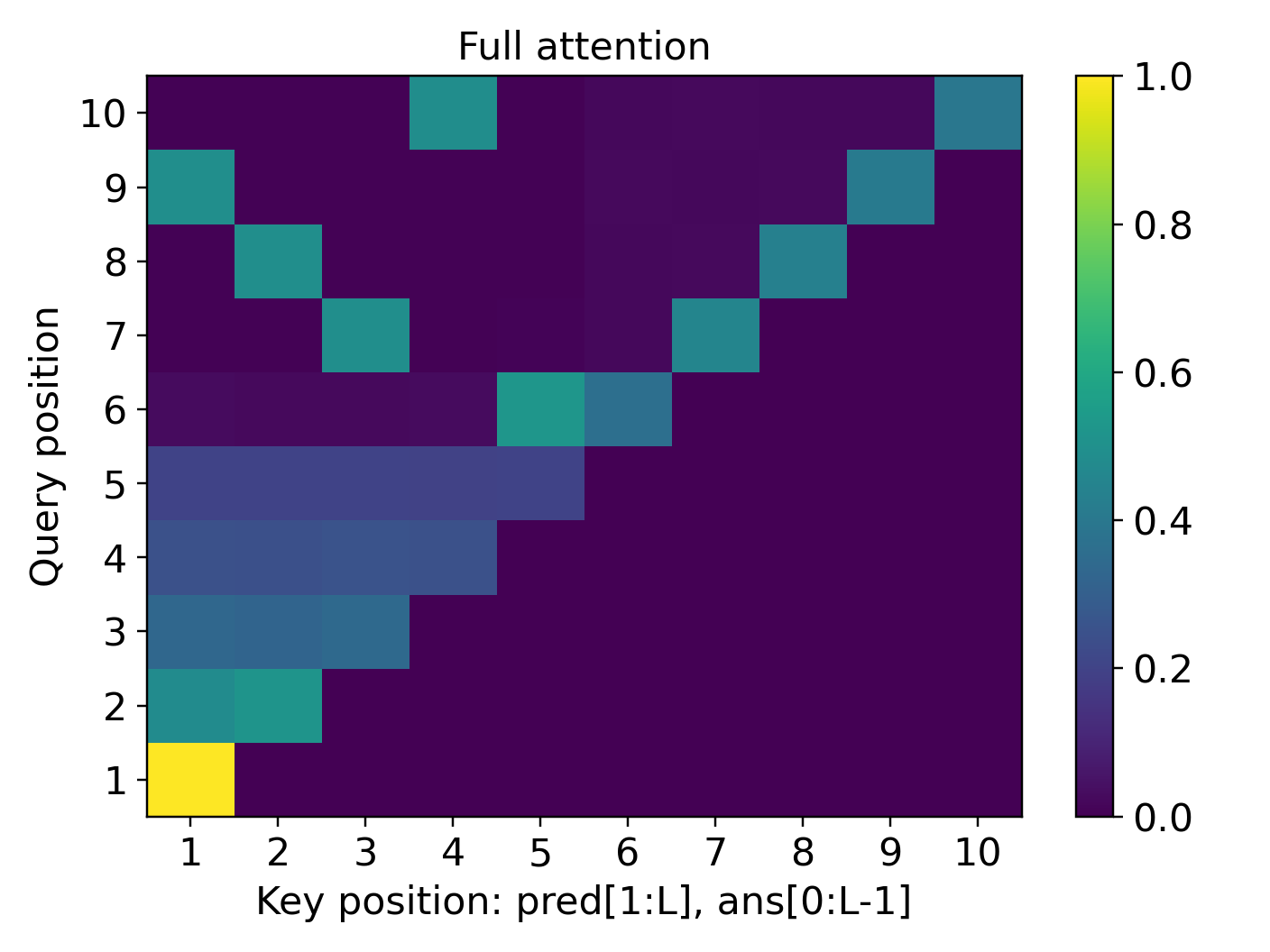}
\caption{\centering $C_6$}
    \end{subfigure}
    \hfill
    \begin{subfigure}[t]{0.3\textwidth}
        \centering
\includegraphics[width=\linewidth]{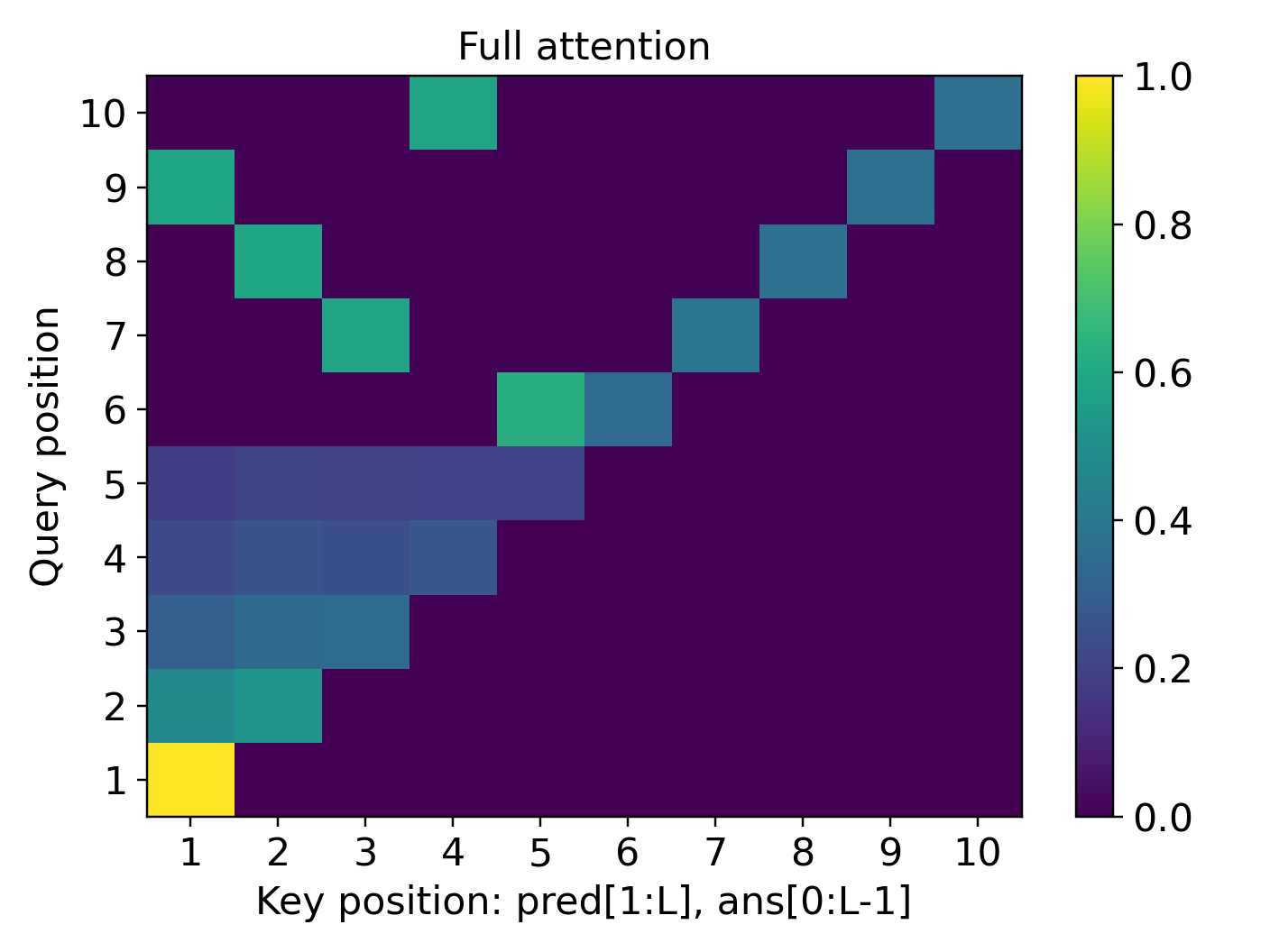}
\caption{\centering $S_5$}
    \end{subfigure}
    \caption {Attention patterns of the same trained
model as \Cref{fig:attn-con}, evaluated  with randomly permuted
predicate-clause positions. Column (a) gives the ground-truth permutation. Column (b) and Column (c) show the attention heatmaps for the simply transitive and symmetry tasks, respectively.}
    \label{fig:attn-con-perm}
\end{figure}
\begin{figure}[t]
  \centering
  \begin{subfigure}[t]{\textwidth}
    \centering
    \begin{minipage}[b]{0.53\textwidth}
      \centering
      \includegraphics[width=\linewidth]{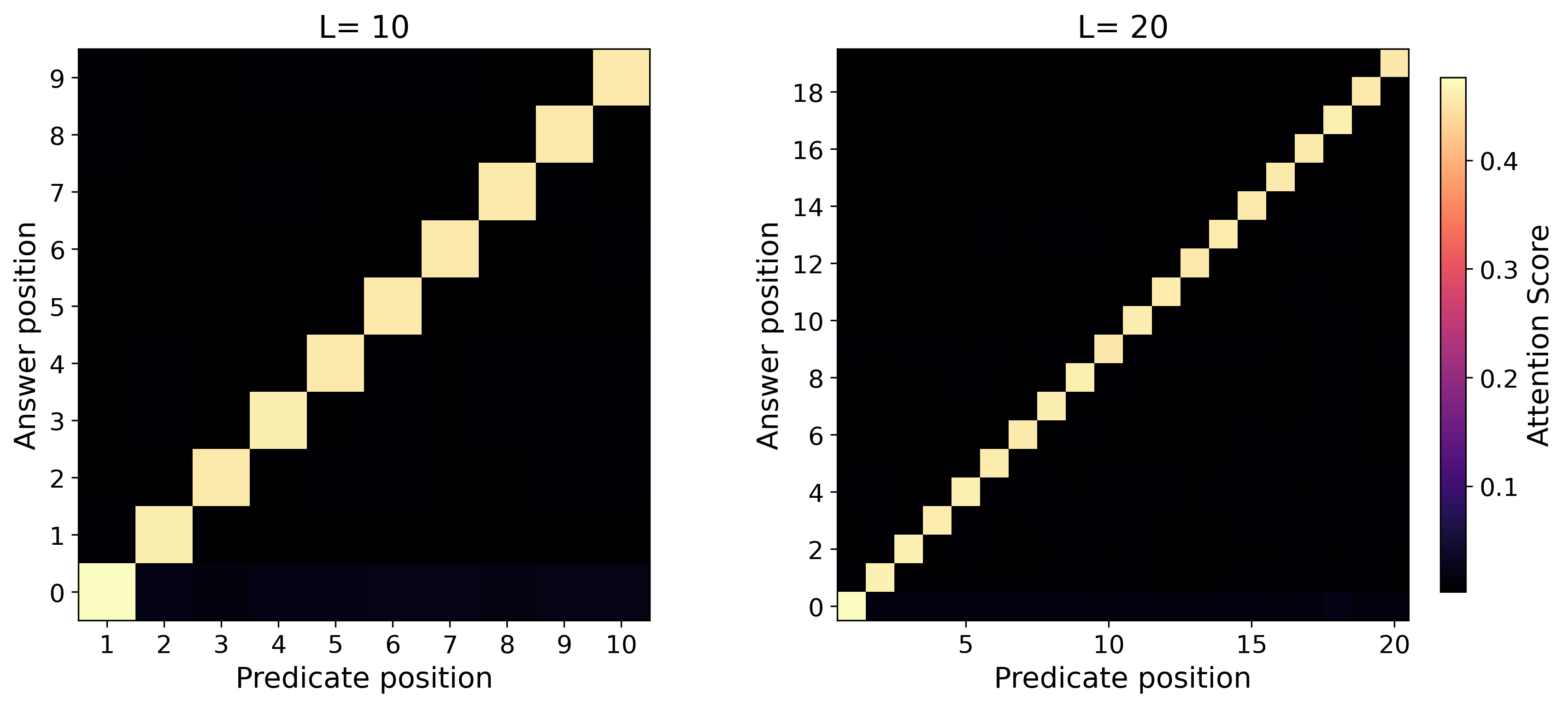}
    \end{minipage}
    \begin{minipage}[b]{0.4\textwidth}
      \centering
      \includegraphics[width=.9\linewidth]{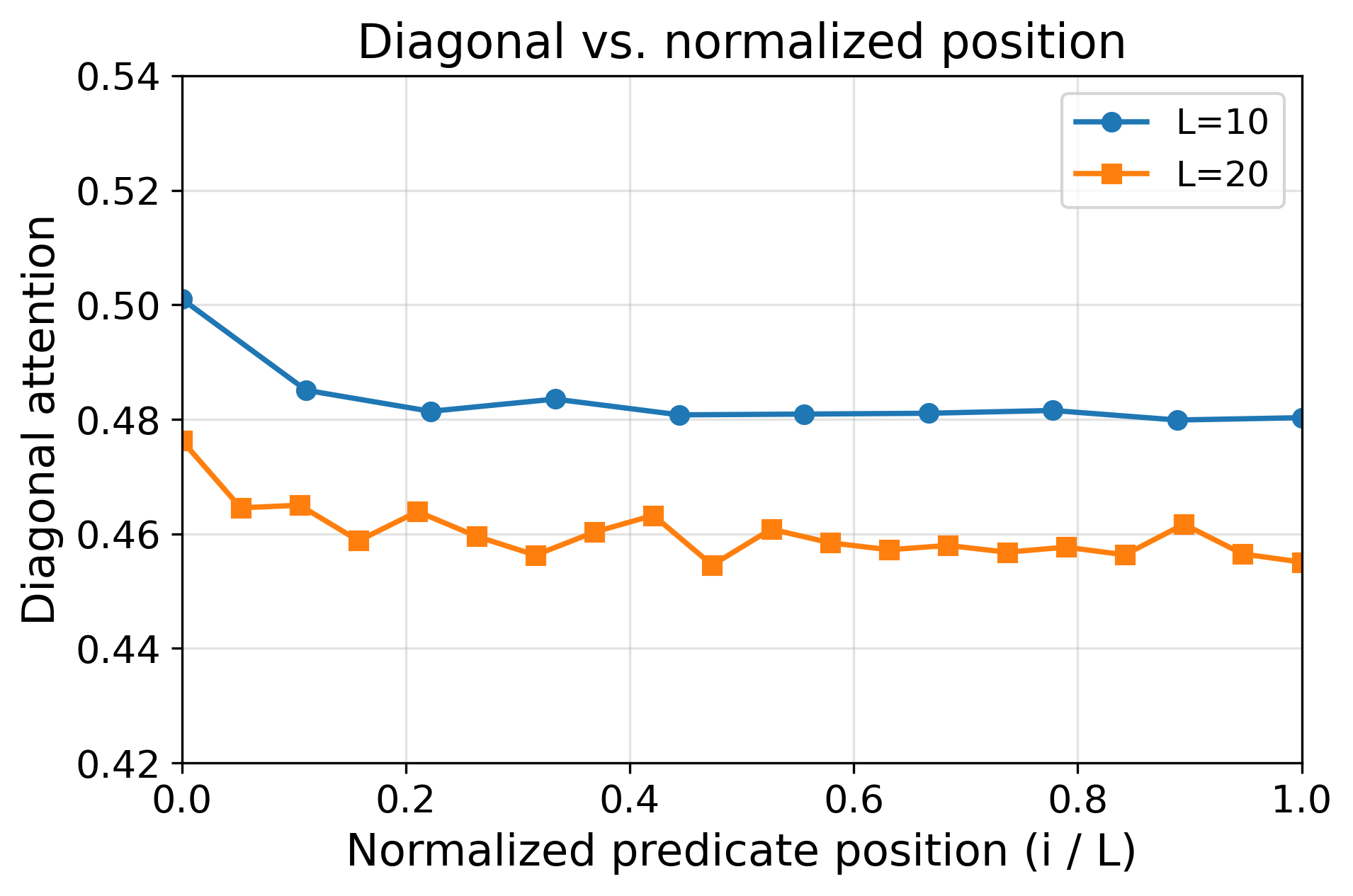}
    \end{minipage}
    \caption{\centering $C_6$}
    \label{fig:attn-con-diff-row1}
  \end{subfigure}

  \vspace{0.5em}

  \begin{subfigure}[t]{\textwidth}
    \centering
    \begin{minipage}[b]{0.53\textwidth}
      \centering
      \includegraphics[width=\linewidth]{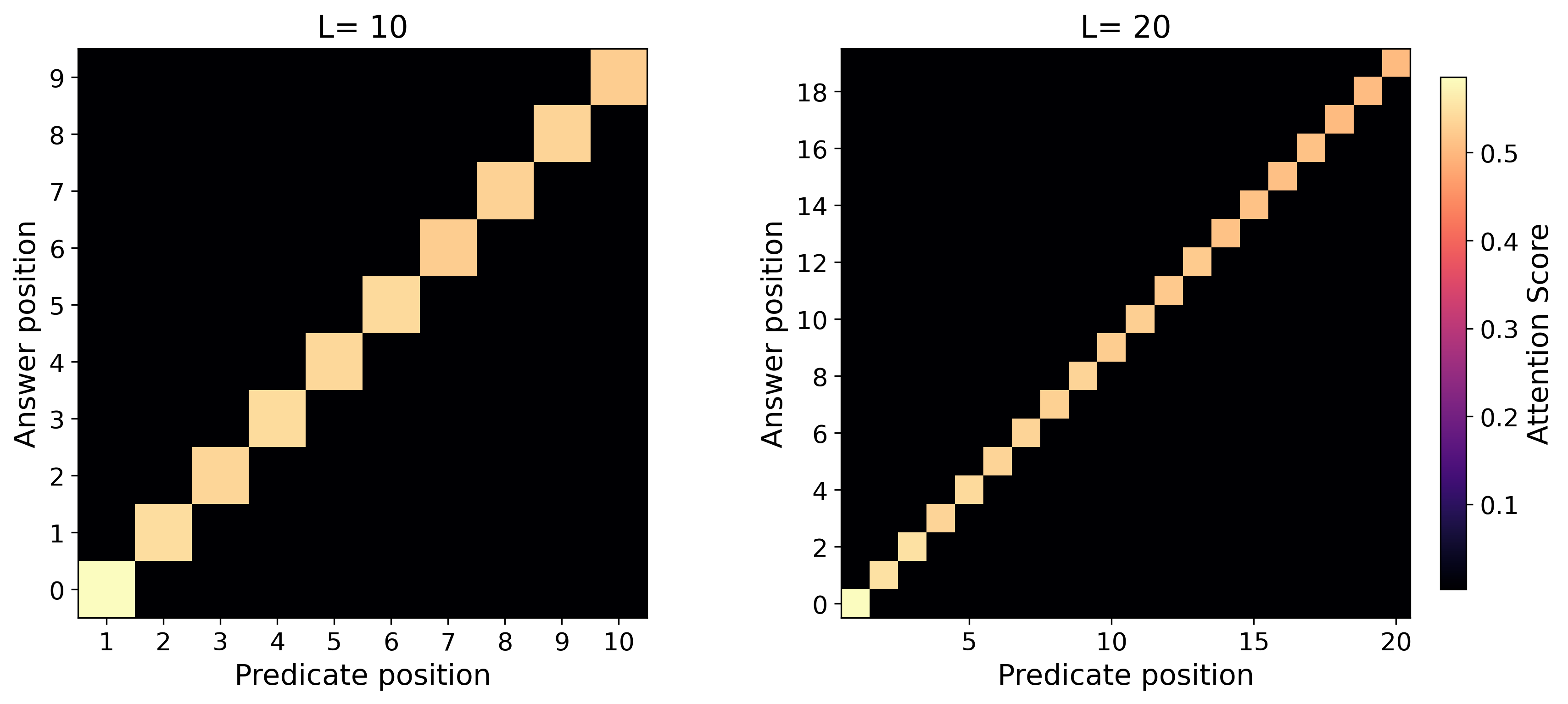}
    \end{minipage}
    \begin{minipage}[b]{0.4\textwidth}
      \centering
      \includegraphics[width=.9\linewidth]{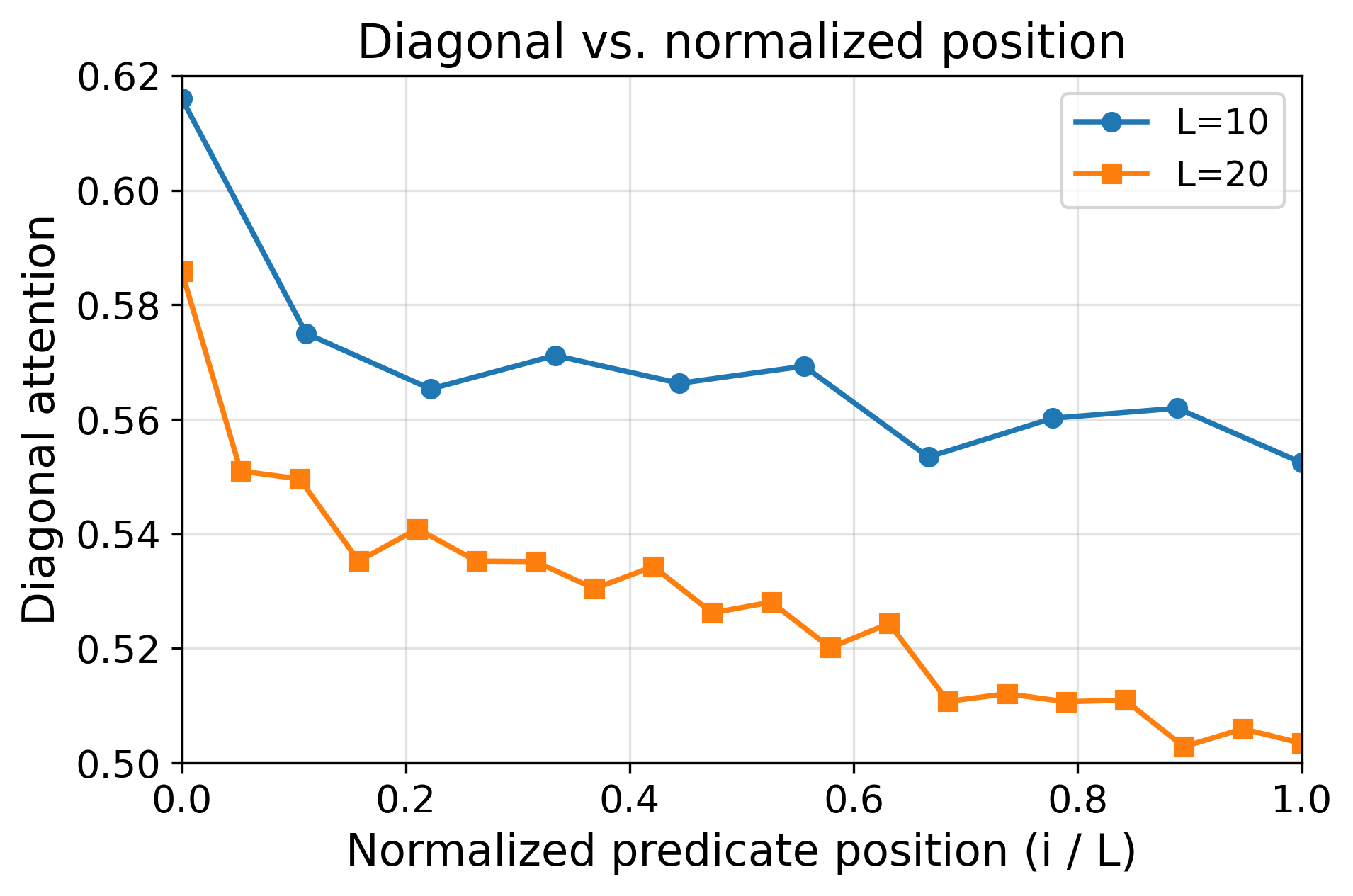}
    \end{minipage}
    \caption{\centering $S_5$}
    \label{fig:attn-con-diff-row2}
  \end{subfigure}

  \caption{
    Predicate-clause attention for models trained with LEGO task of length
    $L=5$ and tested with $L=10,20$.
    We extract the heatmap of the attention from answer queries to predicate
    clauses (the upper-left part of the full attention heatmap as in
    \Cref{fig:attn-con}), and line-plot the diagonal attention against the normalized predicate position.
    At longer contexts, the symmetry group task displays a visible drop in
    attention to the target predicate $Z_{\pred,\ell+1}$ at later predicate
    locations (larger $\ell$; note the absolute query index $L{+}\ell$ also
    increases).
    By contrast, the cyclic group task maintains more consistent attention across
    lengths and positions.
  }
  \label{fig:attn-con-diff}
\end{figure}

\paragraph{Self-training improves reasoning length.}  To extend the solvable length on the harder symmetry task, we adopt a recursive
self-training curriculum inspired by \cite{lee2025selfimproving}.
We first train at $L=5$ with ground-truth answer supervision until convergence.
At the next stage, we double the length to $L=10$, use the $L{=}5$ model to
greedily generate answer traces (self-labels) for the $L{=}10$ data, and retrain
on these pseudo-labels.
We repeat this doubling process for three stages ($L=5\to10\to20\to40$),
so that the final model is trained on self-labeled data at $L=40$.
\Cref{fig:main-results-b} reports the length-generalization curve for each stage, with
the dashed line (matching the curve’s color) marking that stage’s training length.
Across stages, we observe at least constant-factor generalization beyond the
training length; after multiple rounds of self-training, the model achieves nearly
perfect accuracy at lengths far exceeding the initial $L=5$, ultimately matching
the simply transitive task’s performance up to $L=160$.
These results validate \Cref{thm:length-gen-self-training} and demonstrate the
effectiveness of recursive self-training for extending reasoning length.

\begin{figure}[t]
  \centering
  \begin{subfigure}[t]{\textwidth}
    \centering
    \begin{minipage}[b]{0.53\textwidth}
      \centering
      \includegraphics[width=\linewidth]{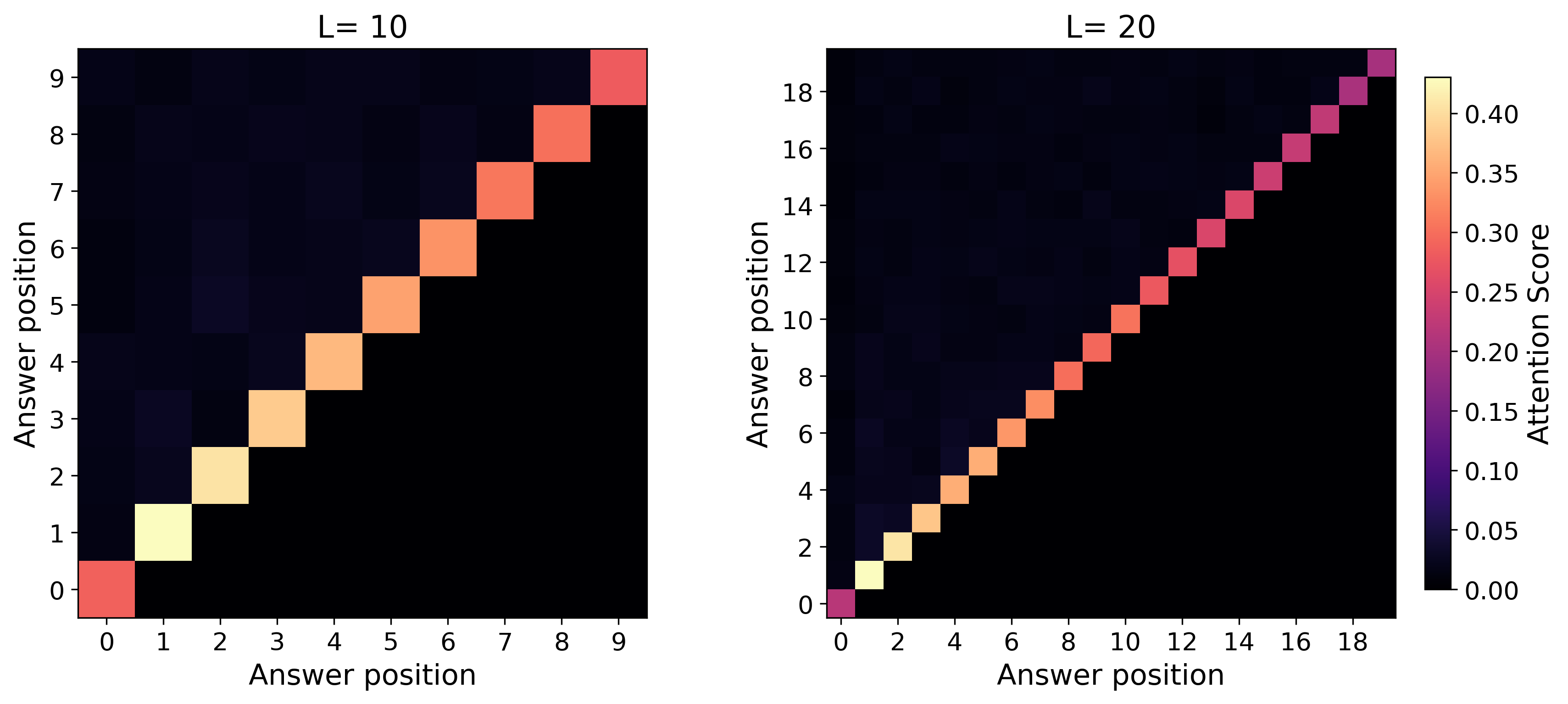}
    \end{minipage}
    \begin{minipage}[b]{0.4\textwidth}
      \centering
      \includegraphics[width=.9\linewidth]{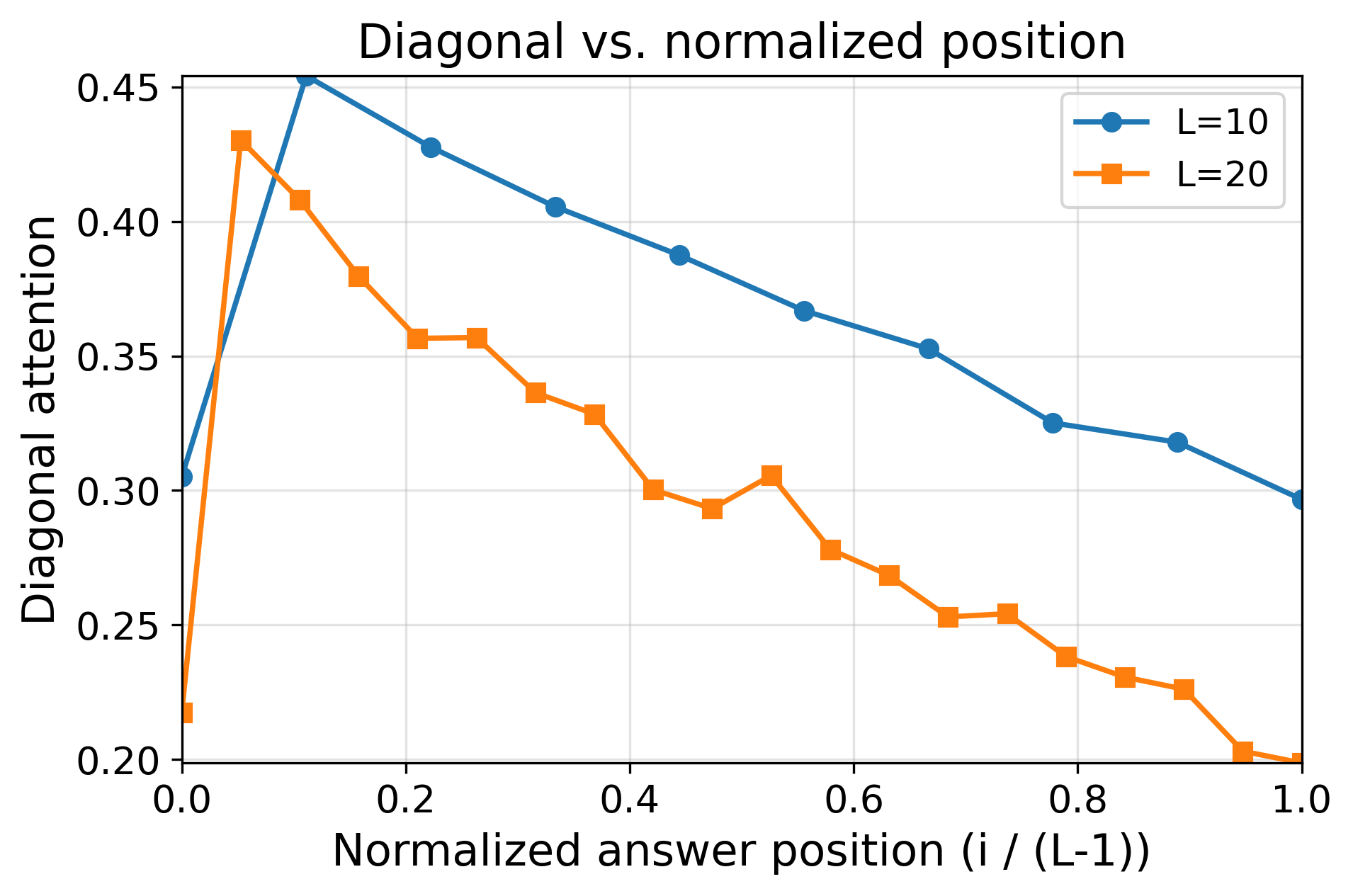}
    \end{minipage}
    \caption{\centering $C_6$}
    \label{fig:attn-con-diff-row1-ans}
  \end{subfigure}

  \vspace{0.5em}

  \begin{subfigure}[t]{\textwidth}
    \centering
    \begin{minipage}[b]{0.53\textwidth}
      \centering
      \includegraphics[width=\linewidth]{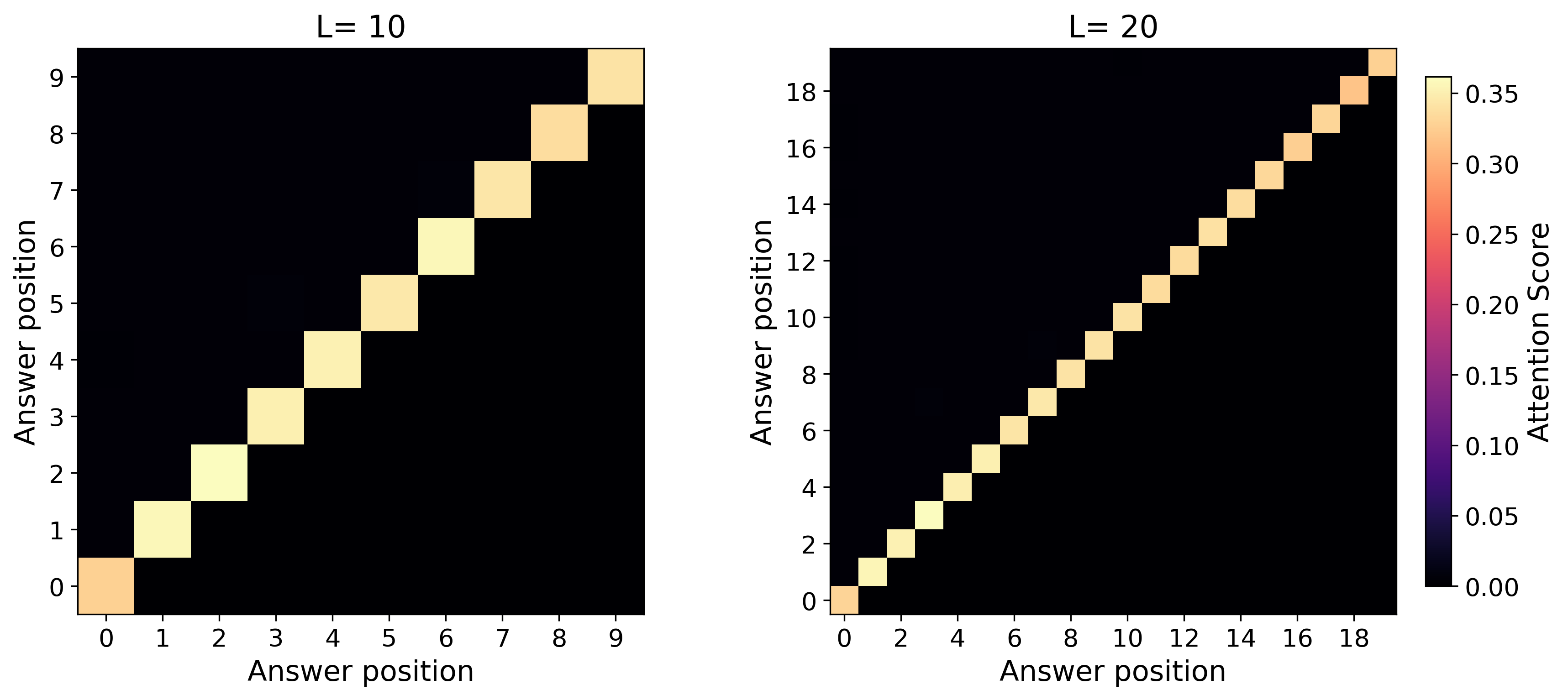}
    \end{minipage}
    \begin{minipage}[b]{0.4\textwidth}
      \centering
      \includegraphics[width=.9\linewidth]{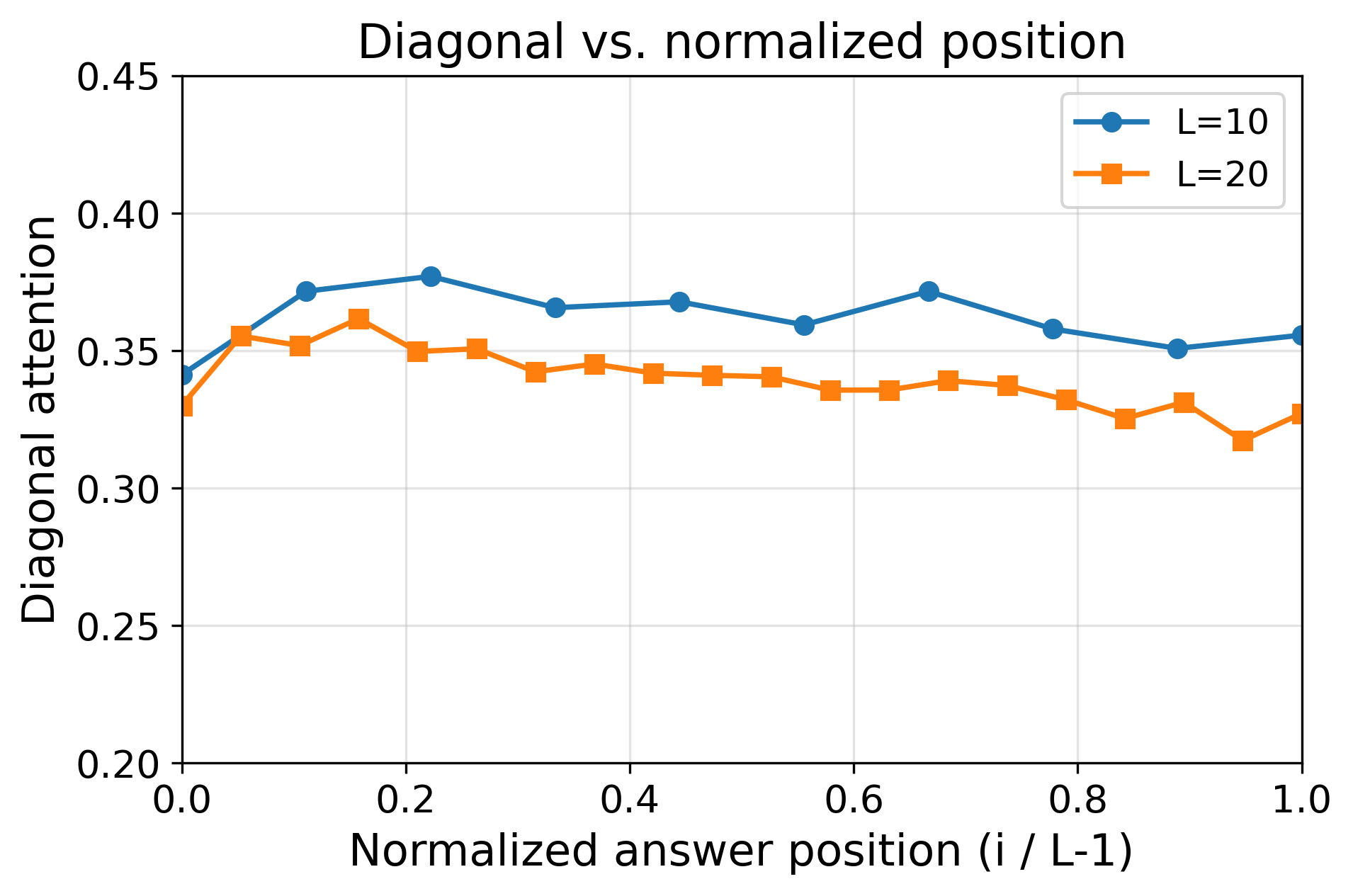}
    \end{minipage}
    \caption{\centering $S_5$}
    \label{fig:attn-con-diff-row2-ans}
  \end{subfigure}

  \caption{
    Answer-clause attention for models trained with LEGO task of length
    $L=5$ and tested with $L=10,20$.
    We extract the heatmap of the attention from answer queries to answer
    clauses (the upper-right part of the full attention heatmap, as in
    \Cref{fig:attn-con}), and line-plot the diagonal attention against the normalized answer position. In contrast to the predicate attention in \Cref{fig:attn-con-diff}, 
    at longer contexts, the symmetry group task preserves a stable attention for the answer clause, while the answer attention pattern of the cyclic group task is sensitive to the lengths and positions. Nevertheless, the sharp decay of attention scores for the cyclic group task did not result in a visible performance drop for length generalization, as seen in \Cref{fig:main-results}.
  }
  \label{fig:attn-con-diff-ans}
\end{figure}

\paragraph{Attention concentration.} We visualize attention heatmaps at convergence for models trained at length $L=5$ on both simply transitive and symmetry group tasks.
Each attention matrix is averaged across the two heads and over 100 independently
sampled LEGO sequences. Note that for a task of length \(L\), the LEGO sequence prior to the final
answer clause has length \(2L\). We focus on the query positions \(L{+}1,\ldots,2L\), which correspond to the
answer clauses \(Z_{\ans,0},\ldots,Z_{\ans,L-1}\) for predicting all outputs
\(y_1,\ldots,y_L\). 
\begin{itemize}
    \item In \Cref{fig:attn-con}, the two diagonal structures in the upper region indicate attention concentrating on the answer clause $Z_{\ans,\ell}$ and on the predicate clause $Z_{\pred,\ell+1}$ when the query is $Z_{\ans,\ell}$, validating the attention concentration principle highlighted by \Cref{thm:length-generalization} and \Cref{thm:length-gen-self-training}.
    \item  To ablate positional bias, we test on the trained model from
\Cref{fig:attn-con} and only change the input format by randomly permuting the
locations of predicate clauses.
Results in \Cref{fig:attn-con-perm} show that retrieval is keyed to the shared
variable rather than absolute position.
    \item \Cref{fig:attn-con-diff,fig:attn-con-diff-ans} probe out-of-length (OOL) behavior
through attention patterns at test lengths $L\in\{10,20\}$. As the line plot indicates, the attention scores for both the target predicate
and the answer clauses decrease as the task length increases from $10$ to
$20$.
This pattern matches the attention dilution predicted by our theory of length
generalization.  In \Cref{fig:attn-con-diff}, for the symmetry group task, attention to the
correct predicate $Z_{\pred,\ell+1}$ decays with position, becoming weaker at
later predicates (larger $\ell$, hence a larger absolute index $L{+}\ell$).
By contrast, the cyclic group task shows no comparable decay, with attention
remaining fairly uniform across lengths and positions.
In \Cref{fig:attn-con-diff-ans}, the pattern reverses: attention over answer
clauses is more stable for the symmetry group, whereas the cyclic case is more
sensitive to length and location.
We conjecture that length generalization in the LEGO task hinges on a robust
predicate–attention pattern.
Accordingly, the observed length-dependent decay of attention on predicate
clauses in the symmetry task indicates a non-robust learned attention
mechanism, which may underlie its poorer length generalization.
\end{itemize}

\section{Conclusions and Discussions}\label{sec:discuss}

In this paper, we have theoretically analyzed how length-generalizable reasoning ability emerges during gradient-descent training of transformers on synthetic CoT tasks. To the best of our knowledge, our results provide the first optimization theory for learning CoT with length generalization guarantees. 
For tasks associated with simply transitive actions, we have proven that transformers can directly generalize from short constant-length chains to substantially longer tasks. For tasks associated with symmetry actions, we have established convergence guarantees showing that transformers can bootstrap their CoT length through recursive self-training. By addressing inherently sequential tasks beyond $\mathsf{TC}^0$, our optimization theory bridges the gap with  known expressive power of transformers with CoT, while in the meantime rigorously validating the effectiveness of self-improvement training observed empirically.
We wrap up this paper with a few discussions.

\paragraph{NoPE, recency bias, and length generalization.} Empirically, standard positional embeddings often hinder length extrapolation, while NoPE has been advantageous considering its length-generalization performance~\cite{kazemnejad2023impact,Allenzhu2025-canon}. Prior works~\cite{kim2025transformersprovably,wen2025sparse,huang2025transformerslearn} typically adopt fixed positional encodings, which tie the learned computation to the training horizon. Intuitively, positional embeddings inject location-specific biases that favor local neighborhoods,  making longer inputs harder.  In our setting, length generalization is possible because the model retrieves
relevant information from long, unordered contexts by \emph{content} (variables)
rather than by position, as discussed in \Cref{sec:overview-attention} and
verified empirically in \Cref{fig:attn-con-perm}.
This provides concrete architectural guidance for practice and identifies positional embeddings as a plausible cause of observed failures to generalize to unseen lengths.
\paragraph{Local aggregation of tokens is beneficial.}
While NoPE is appealing for length generalization, it lacks  position-aware horizontal mixing within a layer, which often hurts empirical performance. In practice, many strong models adopt RoPE (Rotary Positional Encoding)~\cite{su2024roformer,peng2023yarn,ding2024longrope},
and recent seminal work~\cite{Allenzhu2025-canon} introduces the \emph{Cannon} layer, a short
convolutional window that adds local residual links across neighboring tokens, to
inject locality. Integrating Cannon layers substantially boosts the reasoning depth of NoPE
transformers in synthetic experiments. Our clause-based data can be interpreted as mimicking the
\emph{output} of a Cannon layer: locality is built directly into the data
structure, enabling one-layer transformers to learn induction heads, which otherwise require at least a two-layer transformer to express.
\paragraph{Context rot.}  Context rot is a phenomenon that the performance of LLMs drops as their input context grows even when the task is unchanged, which has been widely noted in practice~\cite{hong2025context}. A prevailing empirical view is that longer contexts introduce many \emph{distractors} and other irrelevant tokens,
forcing the relevant signal to compete with them for attention and thereby weakening
retrieval. Our attention concentration perspective characterizes this competition at a
fine-grained level: we show that as irrelevant clauses accumulate, attention mass
is diluted away from the correct clause, reducing performance at extended reasoning lengths. This offers a simple theoretical perspective on the origin of context rot and why long-context training hasn't been as effective in eliminating the problem. We expect the insight from our analysis to extend to richer tasks and inspire practical mitigation strategies, e.g., \emph{context engineering}~\cite{mei2025survey}.

\section*{Acknowledgments}

The work of Z.~Wen is supported in part by NSF DMS-2134080 and DMS-2134133. Y.~Chen is supported in part by the Alfred P.~Sloan Research Fellowship, 
the NSF grants IIS-2218713 and IIS-2218773, the ONR grants N00014-22-1-2354 and N00014-25-1-2344, the Wharton AI \& Analytics Initiative's AI Research Fund, 
and the Amazon Research Award. Z.~Wen thanks Yuanzhi Li for initial contributions and helpful discussions on this project.

\bibliographystyle{alphaabbr}
\bibliography{cot}

\newcommand{\etalchar}[1]{$^{#1}$}
\begin{thebibliography}{WWS{\etalchar{+}}22b}

\bibitem[AB06]{arora2006computational}
S.~Arora and B.~Barak.
\newblock {\em Computational Complexity: A Modern Approach}.
\newblock Cambridge University Press, 2006.

\bibitem[ABL{\etalchar{+}}24]{abbe2024fartransformers}
E.~Abbe, S.~Bengio, A.~Lotfi, C.~Sandon, and O.~Saremi.
\newblock How far can transformers reason? the globality barrier and inductive scratchpad.
\newblock {\em Advances in Neural Information Processing Systems}, 37:27850--27895, 2024.

\bibitem[{All}25]{Allenzhu2025-canon}
Z.~{Allen-Zhu}.
\newblock {Physics of Language Models: Part 4.1, Architecture Design and the Magic of Canon Layers}.
\newblock {\em SSRN Electronic Journal}, May 2025.
\newblock \url{https://ssrn.com/abstract=5240330}.

\bibitem[AM24]{ahuja2024provable}
K.~Ahuja and A.~Mansouri.
\newblock On provable length and compositional generalization.
\newblock {\em arXiv preprint arXiv:2402.04875}, 2024.

\bibitem[AWA{\etalchar{+}}22]{anil2022exploring}
C.~Anil, Y.~Wu, A.~Andreassen, A.~Lewkowycz, V.~Misra, V.~Ramasesh, A.~Slone, G.~Gur-Ari, E.~Dyer, and B.~Neyshabur.
\newblock Exploring length generalization in large language models.
\newblock {\em Advances in Neural Information Processing Systems}, 35:38546--38556, 2022.

\bibitem[AZL20]{allen2020towards}
Z.~Allen-Zhu and Y.~Li.
\newblock Towards understanding ensemble, knowledge distillation and self-distillation in deep learning.
\newblock {\em arXiv preprint arXiv:2012.09816}, 2020.

\bibitem[AZL21]{allen2021forward}
Z.~Allen-Zhu and Y.~Li.
\newblock Forward super-resolution: How can {GANs} learn hierarchical generative models for real-world distributions.
\newblock {\em arXiv preprint arXiv:2106.02619}, 2021.

\bibitem[AZL22]{allen2022feature}
Z.~Allen-Zhu and Y.~Li.
\newblock Feature purification: How adversarial training performs robust deep learning.
\newblock In {\em 2021 IEEE 62nd Annual Symposium on Foundations of Computer Science (FOCS)}, pages 977--988. IEEE, 2022.

\bibitem[Bar86]{barrington1986bounded}
D.~A. Barrington.
\newblock Bounded-width polynomial-size branching programs recognize exactly those languages in nc.
\newblock In {\em Proceedings of the eighteenth annual ACM symposium on Theory of computing}, pages 1--5, 1986.

\bibitem[BMR{\etalchar{+}}20]{brown2020language}
T.~Brown, B.~Mann, N.~Ryder, M.~Subbiah, J.~D. Kaplan, P.~Dhariwal, A.~Neelakantan, P.~Shyam, G.~Sastry, and A.~Askell.
\newblock Language models are few-shot learners.
\newblock {\em Advances in neural information processing systems}, 33:1877--1901, 2020.

\bibitem[CHX{\etalchar{+}}25]{cheng2025transformers}
Y.~Cheng, Y.~Huang, Z.~Xiong, Y.~Liang, and V.~Y. Tan.
\newblock Transformers provably learn directed acyclic graphs via kernel-guided mutual information.
\newblock {\em arXiv preprint arXiv:2510.25542}, 2025.

\bibitem[CKB{\etalchar{+}}21]{cobbe2021training}
K.~Cobbe, V.~Kosaraju, M.~Bavarian, M.~Chen, H.~Jun, L.~Kaiser, M.~Plappert, J.~Tworek, J.~Hilton, and R.~Nakano.
\newblock Training verifiers to solve math word problems.
\newblock {\em arXiv preprint arXiv:2110.14168}, 2021.

\bibitem[CND{\etalchar{+}}23]{chowdhery2023palm}
A.~Chowdhery, S.~Narang, J.~Devlin, M.~Bosma, G.~Mishra, A.~Roberts, P.~Barham, H.~W. Chung, C.~Sutton, and S.~Gehrmann.
\newblock {PaLM}: Scaling language modeling with pathways.
\newblock {\em Journal of Machine Learning Research}, 24(240):1--113, 2023.

\bibitem[CPW24]{chen2024theoretical}
L.~Chen, B.~Peng, and H.~Wu.
\newblock Theoretical limitations of multi-layer transformer.
\newblock {\em arXiv preprint arXiv:2412.02975}, 2024.

\bibitem[CSV84]{chandra1984constant}
A.~K. Chandra, L.~Stockmeyer, and U.~Vishkin.
\newblock Constant depth reducibility.
\newblock {\em SIAM Journal on Computing}, 13(2):423--439, 1984.

\bibitem[CTJ{\etalchar{+}}21]{chen2021evaluating}
M.~Chen, J.~Tworek, H.~Jun, Q.~Yuan, H.~P. D.~O. Pinto, J.~Kaplan, H.~Edwards, Y.~Burda, N.~Joseph, and G.~Brockman.
\newblock Evaluating large language models trained on code.
\newblock {\em arXiv preprint arXiv:2107.03374}, 2021.

\bibitem[DA25]{deepseekai2025deepseekr1}
DeepSeek-AI.
\newblock {DeepSeek-R1}: Incentivizing reasoning capability in {LLMs} via reinforcement learning.
\newblock {\em ArXiv}, abs/2501.12948, 2025.

\bibitem[Deh11]{dehn1911unendliche}
M.~Dehn.
\newblock {\"U}ber unendliche diskontinuierliche gruppen.
\newblock {\em Mathematische Annalen}, 71(1):116--144, 1911.

\bibitem[Deh87]{dehn1987infinite}
M.~Dehn.
\newblock On infinite discontinuous groups.
\newblock In {\em Papers on Group Theory and Topology}, pages 133--178. Springer, 1987.

\bibitem[DLS{\etalchar{+}}23]{dziri2023faith}
N.~Dziri, X.~Lu, M.~Sclar, X.~L. Li, L.~Jiang, B.~Y. Lin, S.~Welleck, P.~West, C.~Bhagavatula, and R.~Le~Bras.
\newblock Faith and fate: Limits of transformers on compositionality.
\newblock {\em Advances in Neural Information Processing Systems}, 36:70293--70332, 2023.

\bibitem[DZZ{\etalchar{+}}24]{ding2024longrope}
Y.~Ding, L.~L. Zhang, C.~Zhang, Y.~Xu, N.~Shang, J.~Xu, F.~Yang, and M.~Yang.
\newblock Longrope: Extending llm context window beyond 2 million tokens.
\newblock {\em arXiv preprint arXiv:2402.13753}, 2024.

\bibitem[ENO{\etalchar{+}}21]{elhage2021mathematical}
N.~Elhage, N.~Nanda, C.~Olsson, T.~Henighan, N.~Joseph, B.~Mann, A.~Askell, Y.~Bai, A.~Chen, T.~Conerly, N.~DasSarma, D.~Drain, D.~Ganguli, Z.~Hatfield-Dodds, D.~Hernandez, A.~Jones, J.~Kernion, L.~Lovitt, K.~Ndousse, D.~Amodei, T.~Brown, J.~Clark, J.~Kaplan, S.~McCandlish, and C.~Olah.
\newblock A mathematical framework for transformer circuits.
\newblock {\em Transformer Circuits Thread}, 2021.
\newblock https://transformer-circuits.pub/2021/framework/index.html.

\bibitem[Fra23]{fraleigh2023first}
J.~B. Fraleigh.
\newblock {\em A first course in abstract algebra (8th Edition)}.
\newblock Pearson India, 2023.

\bibitem[FZG{\etalchar{+}}23]{feng2023revealing}
G.~Feng, B.~Zhang, Y.~Gu, H.~Ye, D.~He, and L.~Wang.
\newblock Towards revealing the mystery behind chain of thought: a theoretical perspective.
\newblock {\em Advances in Neural Information Processing Systems}, 36:70757--70798, 2023.

\bibitem[GJB{\etalchar{+}}25]{golowich2025role}
N.~Golowich, S.~Jelassi, D.~Brandfonbrener, S.~M. Kakade, and E.~Malach.
\newblock The role of sparsity for length generalization in transformers.
\newblock {\em arXiv preprint arXiv:2502.16792}, 2025.

\bibitem[GJKM25]{gao2025comparison}
T.~Gao, J.~Jin, Z.~T. Ke, and G.~Moryoussef.
\newblock A comparison of {DeepSeek} and other {LLMs}.
\newblock {\em arXiv preprint arXiv:2502.03688}, 2025.

\bibitem[GLL87]{godbeer1987computational}
G.~H. Godbeer, J.~Lipscomb, and M.~G. Luby.
\newblock {\em On the computational complexity of finding stable state vectors in connectionist models (Hopfield nets)}.
\newblock Department of Computer Science, University of Toronto, 1987.

\bibitem[GPS{\etalchar{+}}23]{gulcehre2023reinforced}
C.~Gulcehre, T.~L. Paine, S.~Srinivasan, K.~Konyushkova, L.~Weerts, A.~Sharma, A.~Siddhant, A.~Ahern, M.~Wang, and C.~Gu.
\newblock Reinforced self-training (rest) for language modeling.
\newblock {\em arXiv preprint arXiv:2308.08998}, 2023.

\bibitem[GZA{\etalchar{+}}23]{gunasekar2023textbooks}
S.~Gunasekar, Y.~Zhang, J.~Aneja, C.~C.~T. Mendes, A.~Del~Giorno, S.~Gopi, M.~Javaheripi, P.~Kauffmann, G.~de~Rosa, and O.~Saarikivi.
\newblock Textbooks are all you need.
\newblock {\em arXiv preprint arXiv:2306.11644}, 2023.

\bibitem[HBF{\etalchar{+}}24]{huang2024self}
A.~Huang, A.~Block, D.~J. Foster, D.~Rohatgi, C.~Zhang, M.~Simchowitz, J.~T. Ash, and A.~Krishnamurthy.
\newblock Self-improvement in language models: The sharpening mechanism.
\newblock {\em arXiv preprint arXiv:2412.01951}, 2024.

\bibitem[HBK{\etalchar{+}}24]{hou2024universal}
K.~Hou, D.~Brandfonbrener, S.~Kakade, S.~Jelassi, and E.~Malach.
\newblock Universal length generalization with turing programs.
\newblock {\em arXiv preprint arXiv:2407.03310}, 2024.

\bibitem[HCL24]{huang2023context}
Y.~Huang, Y.~Cheng, and Y.~Liang.
\newblock In-context convergence of transformers.
\newblock {\em International Conference on Machine Learning}, pages 19660--19722, 2024.

\bibitem[HLY25]{huang2025transformerslearn}
R.~Huang, Y.~Liang, and J.~Yang.
\newblock How transformers learn regular language recognition: A theoretical study on training dynamics and implicit bias.
\newblock {\em arXiv preprint arXiv:2505.00926}, 2025.

\bibitem[HLZ{\etalchar{+}}22]{huang2022modality}
Y.~Huang, J.~Lin, C.~Zhou, H.~Yang, and L.~Huang.
\newblock Modality competition: What makes joint training of multi-modal network fail in deep learning?(provably).
\newblock In {\em International conference on machine learning}, pages 9226--9259. PMLR, 2022.

\bibitem[HSK{\etalchar{+}}24]{hsieh2024ruler}
C.-P. Hsieh, S.~Sun, S.~Kriman, S.~Acharya, D.~Rekesh, F.~Jia, Y.~Zhang, and B.~Ginsburg.
\newblock Ruler: What's the real context size of your long-context language models?
\newblock {\em arXiv preprint arXiv:2404.06654}, 2024.

\bibitem[HSY22]{ho2022large}
N.~Ho, L.~Schmid, and S.-Y. Yun.
\newblock Large language models are reasoning teachers.
\newblock {\em arXiv preprint arXiv:2212.10071}, 2022.

\bibitem[HTH25]{hong2025context}
K.~Hong, A.~Troynikov, and J.~Huber.
\newblock Context rot: How increasing input tokens impacts {LLM} performance.
\newblock Technical report, Chroma, July 2025.

\bibitem[HWCL25]{huang2025a}
Y.~Huang, Z.~Wen, Y.~Chi, and Y.~Liang.
\newblock A theoretical analysis of self-supervised learning for vision transformers.
\newblock In {\em The Thirteenth International Conference on Learning Representations}, 2025.

\bibitem[HWL25]{huang2025transformers}
J.~Huang, Z.~Wang, and J.~D. Lee.
\newblock Transformers learn to implement multi-step gradient descent with chain of thought.
\newblock {\em arXiv preprint arXiv:2502.21212}, 2025.

\bibitem[HYB{\etalchar{+}}24]{huang2024formal}
X.~Huang, A.~Yang, S.~Bhattamishra, Y.~Sarrof, A.~Krebs, H.~Zhou, P.~Nakkiran, and M.~Hahn.
\newblock A formal framework for understanding length generalization in transformers.
\newblock {\em arXiv preprint arXiv:2410.02140}, 2024.

\bibitem[HZCY24]{hu2024unveiling}
X.~Hu, F.~Zhang, S.~Chen, and Z.~Yang.
\newblock Unveiling the statistical foundations of chain-of-thought prompting methods.
\newblock {\em arXiv preprint arXiv:2408.14511}, 2024.

\bibitem[JB25]{jones2025large}
C.~R. Jones and B.~K. Bergen.
\newblock Large language models pass the turing test.
\newblock {\em arXiv preprint arXiv:2503.23674}, 2025.

\bibitem[JdDE{\etalchar{+}}23]{jelassi2023length}
S.~Jelassi, S.~d'Ascoli, C.~Domingo-Enrich, Y.~Wu, Y.~Li, and F.~Charton.
\newblock Length generalization in arithmetic transformers.
\newblock {\em arXiv preprint arXiv:2306.15400}, 2023.

\bibitem[JSL22]{jelassi2022vision}
S.~Jelassi, M.~Sander, and Y.~Li.
\newblock Vision transformers provably learn spatial structure.
\newblock {\em Advances in Neural Information Processing Systems}, 35:37822--37836, 2022.

\bibitem[JVB{\etalchar{+}}25]{joshi2025theory}
N.~Joshi, G.~Vardi, A.~Block, S.~Goel, Z.~Li, T.~Misiakiewicz, and N.~Srebro.
\newblock A theory of learning with autoregressive chain of thought.
\newblock {\em arXiv preprint arXiv:2503.07932}, 2025.

\bibitem[KB14]{Kingma2014AdamAM}
D.~P. Kingma and J.~Ba.
\newblock Adam: A method for stochastic optimization.
\newblock {\em CoRR}, abs/1412.6980, 2014.

\bibitem[KBA{\etalchar{+}}24]{kuratov2024babilong}
Y.~Kuratov, A.~Bulatov, P.~Anokhin, I.~Rodkin, D.~Sorokin, A.~Sorokin, and M.~Burtsev.
\newblock Babilong: Testing the limits of llms with long context reasoning-in-a-haystack.
\newblock {\em Advances in Neural Information Processing Systems}, 37:106519--106554, 2024.

\bibitem[KGR{\etalchar{+}}22]{kojima2022large}
T.~Kojima, S.~S. Gu, M.~Reid, Y.~Matsuo, and Y.~Iwasawa.
\newblock Large language models are zero-shot reasoners.
\newblock {\em Advances in neural information processing systems}, 35:22199--22213, 2022.

\bibitem[KPNR{\etalchar{+}}23]{kazemnejad2023impact}
A.~Kazemnejad, I.~Padhi, K.~Natesan~Ramamurthy, P.~Das, and S.~Reddy.
\newblock The impact of positional encoding on length generalization in transformers.
\newblock {\em Advances in Neural Information Processing Systems}, 36:24892--24928, 2023.

\bibitem[KS23]{kim2023entity}
N.~Kim and S.~Schuster.
\newblock Entity tracking in language models.
\newblock {\em arXiv preprint arXiv:2305.02363}, 2023.

\bibitem[KS24]{kim2025transformersprovably}
J.~Kim and T.~Suzuki.
\newblock Transformers provably solve parity efficiently with chain of thought.
\newblock {\em arXiv preprint arXiv:2410.08633}, 2024.

\bibitem[KWLS25]{kim2025metastable}
J.~Kim, D.~Wu, J.~Lee, and T.~Suzuki.
\newblock Metastable dynamics of chain-of-thought reasoning: Provable benefits of search, rl and distillation.
\newblock {\em arXiv preprint arXiv:2502.01694}, 2025.

\bibitem[KZA{\etalchar{+}}24]{kumar2024training}
A.~Kumar, V.~Zhuang, R.~Agarwal, Y.~Su, J.~D. Co-Reyes, A.~Singh, K.~Baumli, S.~Iqbal, C.~Bishop, and R.~Roelofs.
\newblock Training language models to self-correct via reinforcement learning.
\newblock {\em arXiv preprint arXiv:2409.12917}, 2024.

\bibitem[LAD{\etalchar{+}}22]{lewkowycz2022solving}
A.~Lewkowycz, A.~Andreassen, D.~Dohan, E.~Dyer, H.~Michalewski, V.~Ramasesh, A.~Slone, C.~Anil, I.~Schlag, and T.~Gutman-Solo.
\newblock Solving quantitative reasoning problems with language models.
\newblock {\em Advances in Neural Information Processing Systems}, 35:3843--3857, 2022.

\bibitem[LAG{\etalchar{+}}22]{liu2023transformers}
B.~Liu, J.~T. Ash, S.~Goel, A.~Krishnamurthy, and C.~Zhang.
\newblock Transformers learn shortcuts to automata.
\newblock {\em arXiv preprint arXiv:2210.10749}, 2022.

\bibitem[LCS{\etalchar{+}}25]{lee2025selfimproving}
N.~Lee, Z.~Cai, A.~Schwarzschild, K.~Lee, and D.~Papailiopoulos.
\newblock Self-improving transformers overcome easy-to-hard and length generalization challenges.
\newblock {\em arXiv preprint arXiv:2502.01612}, 2025.

\bibitem[LGA25]{Li2025HowDL}
B.~Z. Li, Z.~C. Guo, and J.~Andreas.
\newblock (how) do language models track state?
\newblock {\em ArXiv}, abs/2503.02854, 2025.

\bibitem[LLY{\etalchar{+}}25]{ling2025longreason}
Z.~Ling, K.~Liu, K.~Yan, Y.~Yang, W.~Lin, T.-H. Fan, L.~Shen, Z.~Du, and J.~Chen.
\newblock Longreason: A synthetic long-context reasoning benchmark via context expansion.
\newblock {\em arXiv preprint arXiv:2501.15089}, 2025.

\bibitem[LLZM24]{li2024chainthought}
Z.~Li, H.~Liu, D.~Zhou, and T.~Ma.
\newblock Chain of thought empowers transformers to solve inherently serial problems.
\newblock {\em arXiv preprint arXiv:2402.12875}, 1, 2024.

\bibitem[LPW{\etalchar{+}}17]{lu2017expressive}
Z.~Lu, H.~Pu, F.~Wang, Z.~Hu, and L.~Wang.
\newblock The expressive power of neural networks: A view from the width.
\newblock {\em Advances in neural information processing systems}, 30, 2017.

\bibitem[LSG{\etalchar{+}}23]{li2023dissecting}
Y.~Li, K.~Sreenivasan, A.~Giannou, D.~Papailiopoulos, and S.~Oymak.
\newblock Dissecting chain-of-thought: Compositionality through in-context filtering and learning.
\newblock {\em Advances in Neural Information Processing Systems}, 36:22021--22046, 2023.

\bibitem[Maa96]{maass1996lower}
W.~Maass.
\newblock Lower bounds for the computational power of networks of spiking neurons.
\newblock {\em Neural computation}, 8(1):1--40, 1996.

\bibitem[MCJ{\etalchar{+}}24]{min2024imitate}
Y.~Min, Z.~Chen, J.~Jiang, J.~Chen, J.~Deng, Y.~Hu, Y.~Tang, J.~Wang, X.~Cheng, and H.~Song.
\newblock Imitate, explore, and self-improve: A reproduction report on slow-thinking reasoning systems.
\newblock {\em arXiv preprint arXiv:2412.09413}, 2024.

\bibitem[MDA{\etalchar{+}}24]{marsden2024provable}
A.~Marsden, E.~Dogariu, N.~Agarwal, X.~Chen, D.~Suo, and E.~Hazan.
\newblock Provable length generalization in sequence prediction via spectral filtering.
\newblock {\em arXiv preprint arXiv:2411.01035}, 2024.

\bibitem[MMA{\etalchar{+}}22]{magister2022teaching}
L.~C. Magister, J.~Mallinson, J.~Adamek, E.~Malmi, and A.~Severyn.
\newblock Teaching small language models to reason.
\newblock {\em arXiv preprint arXiv:2212.08410}, 2022.

\bibitem[MPS24]{merrill2024illusion}
W.~Merrill, J.~Petty, and A.~Sabharwal.
\newblock The illusion of state in state-space models.
\newblock {\em arXiv preprint arXiv:2404.08819}, 2024.

\bibitem[MS23a]{merrill2024expressive}
W.~Merrill and A.~Sabharwal.
\newblock The expressive power of transformers with chain of thought.
\newblock {\em arXiv preprint arXiv:2310.07923}, 2023.

\bibitem[MS23b]{merrill2023parallelism}
W.~Merrill and A.~Sabharwal.
\newblock The parallelism tradeoff: Limitations of log-precision transformers.
\newblock {\em Transactions of the Association for Computational Linguistics}, 11:531--545, 2023.

\bibitem[MS25a]{merrill2025exact}
W.~Merrill and A.~Sabharwal.
\newblock Exact expressive power of transformers with padding.
\newblock {\em arXiv preprint arXiv:2505.18948}, 2025.

\bibitem[MS25b]{merrill2025little}
W.~Merrill and A.~Sabharwal.
\newblock A little depth goes a long way: The expressive power of log-depth transformers.
\newblock {\em arXiv preprint arXiv:2503.03961}, 2025.

\bibitem[MSS22]{merrill2022saturated}
W.~Merrill, A.~Sabharwal, and N.~A. Smith.
\newblock Saturated transformers are constant-depth threshold circuits.
\newblock {\em Transactions of the Association for Computational Linguistics}, 10:843--856, 2022.

\bibitem[MY22]{madaan2022text}
A.~Madaan and A.~Yazdanbakhsh.
\newblock Text and patterns: For effective chain of thought, it takes two to tango.
\newblock {\em arXiv preprint arXiv:2209.07686}, 2022.

\bibitem[MYG{\etalchar{+}}25]{mei2025survey}
L.~Mei, J.~Yao, Y.~Ge, Y.~Wang, B.~Bi, Y.~Cai, J.~Liu, M.~Li, Z.-Z. Li, and D.~Zhang.
\newblock A survey of context engineering for large language models.
\newblock {\em arXiv preprint arXiv:2507.13334}, 2025.

\bibitem[NDL24]{nichani2024transformers}
E.~Nichani, A.~Damian, and J.~D. Lee.
\newblock How transformers learn causal structure with gradient descent.
\newblock {\em arXiv preprint arXiv:2402.14735}, 2024.

\bibitem[Ope24]{openai2024o1card}
OpenAI.
\newblock {OpenAI} o1 system card.
\newblock {\em ArXiv}, abs/2412.16720, 2024.

\bibitem[Par94]{parberry1994circuit}
I.~Parberry.
\newblock {\em Circuit complexity and neural networks}.
\newblock MIT press, 1994.

\bibitem[PLG23]{prystawski2023think}
B.~Prystawski, M.~Li, and N.~Goodman.
\newblock Why think step by step? reasoning emerges from the locality of experience.
\newblock {\em Advances in Neural Information Processing Systems}, 36:70926--70947, 2023.

\bibitem[PQFS23]{peng2023yarn}
B.~Peng, J.~Quesnelle, H.~Fan, and E.~Shippole.
\newblock Yarn: Efficient context window extension of large language models.
\newblock {\em arXiv preprint arXiv:2309.00071}, 2023.

\bibitem[QLZ{\etalchar{+}}24]{qin2024o1}
Y.~Qin, X.~Li, H.~Zou, Y.~Liu, S.~Xia, Z.~Huang, Y.~Ye, W.~Yuan, H.~Liu, and Y.~Li.
\newblock O1 replication journey: A strategic progress report--part 1.
\newblock {\em arXiv preprint arXiv:2410.18982}, 2024.

\bibitem[Rin16]{Rinaldo2016Lecture27}
A.~Rinaldo.
\newblock 36-755: Advanced statistical theory, lecture 27.
\newblock Lecture notes, December 5 2016.
\newblock Scribed by Xiao Hui Tai.

\bibitem[RM21]{reynolds2021prompt}
L.~Reynolds and K.~McDonell.
\newblock Prompt programming for large language models: Beyond the few-shot paradigm.
\newblock In {\em Extended abstracts of the 2021 CHI conference on human factors in computing systems}, pages 1--7, 2021.

\bibitem[RS98]{raab1998balls}
M.~Raab and A.~Steger.
\newblock “balls into bins”—a simple and tight analysis.
\newblock In {\em International Workshop on Randomization and Approximation Techniques in Computer Science}, pages 159--170. Springer, 1998.

\bibitem[RSR{\etalchar{+}}20]{raffel2020exploring}
C.~Raffel, N.~Shazeer, A.~Roberts, K.~Lee, S.~Narang, M.~Matena, Y.~Zhou, W.~Li, and P.~J. Liu.
\newblock Exploring the limits of transfer learning with a unified text-to-text transformer.
\newblock {\em Journal of machine learning research}, 21(140):1--67, 2020.

\bibitem[SAL{\etalchar{+}}24]{su2024roformer}
J.~Su, M.~Ahmed, Y.~Lu, S.~Pan, W.~Bo, and Y.~Liu.
\newblock Roformer: Enhanced transformer with rotary position embedding.
\newblock {\em Neurocomputing}, 568:127063, 2024.

\bibitem[SCRA{\etalchar{+}}23]{singh2024human}
A.~Singh, J.~D. Co-Reyes, R.~Agarwal, A.~Anand, P.~Patil, X.~Garcia, P.~J. Liu, J.~Harrison, J.~Lee, and K.~Xu.
\newblock Beyond human data: Scaling self-training for problem-solving with language models.
\newblock {\em arXiv preprint arXiv:2312.06585}, 2023.

\bibitem[SFH{\etalchar{+}}24]{sanford2024understanding}
C.~Sanford, B.~Fatemi, E.~Hall, A.~Tsitsulin, M.~Kazemi, J.~Halcrow, B.~Perozzi, and V.~Mirrokni.
\newblock Understanding transformer reasoning capabilities via graph algorithms.
\newblock {\em Advances in Neural Information Processing Systems}, 37:78320--78370, 2024.

\bibitem[SLZT25]{sun2025theoretical}
Y.~Sun, Y.~Liang, Z.~Zhang, and J.~Teng.
\newblock Theoretical modeling of llm self-improvement training dynamics through solver-verifier gap.
\newblock {\em arXiv preprint arXiv:2507.00075}, 2025.

\bibitem[{\v{S}}O03]{vsima2003general}
J.~{\v{S}}{\'\i}ma and P.~Orponen.
\newblock General-purpose computation with neural networks: A survey of complexity theoretic results.
\newblock {\em Neural Computation}, 15(12):2727--2778, 2003.

\bibitem[SPHG24]{sabbaghi2024explicitly}
M.~Sabbaghi, G.~Pappas, H.~Hassani, and S.~Goel.
\newblock Explicitly encoding structural symmetry is key to length generalization in arithmetic tasks.
\newblock {\em arXiv preprint arXiv:2406.01895}, 2024.

\bibitem[SS92]{siegelmann1992computational}
H.~T. Siegelmann and E.~D. Sontag.
\newblock On the computational power of neural nets.
\newblock In {\em Proceedings of the fifth annual workshop on Computational learning theory}, pages 440--449, 1992.

\bibitem[STAK92]{shawe1992classes}
J.~S. Shawe-Taylor, M.~H. Anthony, and W.~Kern.
\newblock Classes of feedforward neural networks and their circuit complexity.
\newblock {\em Neural networks}, 5(6):971--977, 1992.

\bibitem[SUV18]{shaw2018self}
P.~Shaw, J.~Uszkoreit, and A.~Vaswani.
\newblock Self-attention with relative position representations.
\newblock {\em arXiv preprint arXiv:1803.02155}, 2018.

\bibitem[SZE{\etalchar{+}}24]{song2024mind}
Y.~Song, H.~Zhang, C.~Eisenach, S.~Kakade, D.~Foster, and U.~Ghai.
\newblock Mind the gap: Examining the self-improvement capabilities of large language models.
\newblock {\em arXiv preprint arXiv:2412.02674}, 2024.

\bibitem[tea25]{anthropic2025effective}
A.~A.~A. team.
\newblock Effective context engineering for ai agents.
\newblock Technical report, Anthropic, 2025.

\bibitem[TLI{\etalchar{+}}23]{touvron2023llama}
H.~Touvron, T.~Lavril, G.~Izacard, X.~Martinet, M.-A. Lachaux, T.~Lacroix, B.~Rozi{\`e}re, N.~Goyal, E.~Hambro, and F.~Azhar.
\newblock Llama: Open and efficient foundation language models.
\newblock {\em arXiv preprint arXiv:2302.13971}, 2023.

\bibitem[Vol99]{vollmer1999introduction}
H.~Vollmer.
\newblock {\em Introduction to circuit complexity: a uniform approach}.
\newblock Springer Science \& Business Media, 1999.

\bibitem[VSP{\etalchar{+}}17]{Vaswani2017AttentionIA}
A.~Vaswani, N.~M. Shazeer, N.~Parmar, J.~Uszkoreit, L.~Jones, A.~N. Gomez, L.~Kaiser, and I.~Polosukhin.
\newblock Attention is all you need.
\newblock In {\em Neural Information Processing Systems}, 2017.

\bibitem[WL21]{wen2021toward}
Z.~Wen and Y.~Li.
\newblock Toward understanding the feature learning process of self-supervised contrastive learning.
\newblock In {\em International Conference on Machine Learning}, pages 11112--11122. PMLR, 2021.

\bibitem[WL22]{wen2022mechanism}
Z.~Wen and Y.~Li.
\newblock The mechanism of prediction head in non-contrastive self-supervised learning.
\newblock {\em Advances in Neural Information Processing Systems}, 35:24794--24809, 2022.

\bibitem[WLS22]{wies2023subtask}
N.~Wies, Y.~Levine, and A.~Shashua.
\newblock Sub-task decomposition enables learning in sequence to sequence tasks.
\newblock {\em arXiv preprint arXiv:2204.02892}, 2022.

\bibitem[WNB{\etalchar{+}}25]{wang2025learning}
Z.~Wang, E.~Nichani, A.~Bietti, A.~Damian, D.~Hsu, J.~D. Lee, and D.~Wu.
\newblock Learning compositional functions with transformers from easy-to-hard data.
\newblock {\em arXiv preprint arXiv:2505.23683}, 2025.

\bibitem[WWS{\etalchar{+}}22a]{wang2022self}
X.~Wang, J.~Wei, D.~Schuurmans, Q.~Le, E.~Chi, S.~Narang, A.~Chowdhery, and D.~Zhou.
\newblock Self-consistency improves chain of thought reasoning in language models.
\newblock {\em arXiv preprint arXiv:2203.11171}, 2022.

\bibitem[WWS{\etalchar{+}}22b]{wei2023chain}
J.~Wei, X.~Wang, D.~Schuurmans, M.~Bosma, F.~Xia, E.~Chi, Q.~V. Le, and D.~Zhou.
\newblock Chain-of-thought prompting elicits reasoning in large language models.
\newblock {\em Advances in neural information processing systems}, 35:24824--24837, 2022.

\bibitem[WZLZ24]{wen2025sparse}
K.~Wen, H.~Zhang, H.~Lin, and J.~Zhang.
\newblock From sparse dependence to sparse attention: unveiling how chain-of-thought enhances transformer sample efficiency.
\newblock {\em arXiv preprint arXiv:2410.05459}, 2024.

\bibitem[XGZ{\etalchar{+}}24]{xie2024monte}
Y.~Xie, A.~Goyal, W.~Zheng, M.-Y. Kan, T.~P. Lillicrap, K.~Kawaguchi, and M.~Shieh.
\newblock Monte carlo tree search boosts reasoning via iterative preference learning.
\newblock {\em arXiv preprint arXiv:2405.00451}, 2024.

\bibitem[XL25]{xiao2025generalizing}
C.~Xiao and B.~Liu.
\newblock Generalizing reasoning problems to longer lengths.
\newblock In {\em The Thirteenth International Conference on Learning Representations}, 2025.

\bibitem[YHLC24]{yang2024context}
T.~Yang, Y.~Huang, Y.~Liang, and Y.~Chi.
\newblock In-context learning with representations: Contextual generalization of trained transformers.
\newblock {\em Advances in Neural Information Processing Systems}, 37:85867--85898, 2024.

\bibitem[YHLC25]{yang2025multi}
T.~Yang, Y.~Huang, Y.~Liang, and Y.~Chi.
\newblock Multi-head transformers provably learn symbolic multi-step reasoning via gradient descent.
\newblock {\em arXiv preprint arXiv:2508.08222}, 2025.

\bibitem[YLJ{\etalchar{+}}25]{yang2025longer}
W.~Yang, Z.~Liu, H.~Jin, Q.~Yin, V.~Chaudhary, and X.~Han.
\newblock Longer context, deeper thinking: Uncovering the role of long-context ability in reasoning.
\newblock {\em arXiv preprint arXiv:2505.17315}, 2025.

\bibitem[YSL{\etalchar{+}}25]{yan2025inftythink}
Y.~Yan, Y.~Shen, Y.~Liu, J.~Jiang, M.~Zhang, J.~Shao, and Y.~Zhuang.
\newblock Inftythink: Breaking the length limits of long-context reasoning in large language models.
\newblock {\em arXiv preprint arXiv:2503.06692}, 2025.

\bibitem[ZAC{\etalchar{+}}24]{zhou2024transformers}
Y.~Zhou, U.~Alon, X.~Chen, X.~Wang, R.~Agarwal, and D.~Zhou.
\newblock Transformers can achieve length generalization but not robustly.
\newblock {\em arXiv preprint arXiv:2402.09371}, 2024.

\bibitem[ZBB{\etalchar{+}}22]{zhang2022unveiling}
Y.~Zhang, A.~Backurs, S.~Bubeck, R.~Eldan, S.~Gunasekar, and T.~Wagner.
\newblock Unveiling transformers with lego: a synthetic reasoning task.
\newblock {\em arXiv preprint arXiv:2206.04301}, 2022.

\bibitem[ZBL{\etalchar{+}}23]{zhou2023algorithms}
H.~Zhou, A.~Bradley, E.~Littwin, N.~Razin, O.~Saremi, J.~Susskind, S.~Bengio, and P.~Nakkiran.
\newblock What algorithms can transformers learn? a study in length generalization.
\newblock {\em arXiv preprint arXiv:2310.16028}, 2023.

\bibitem[ZLP{\etalchar{+}}24]{zhong2024evaluation}
T.~Zhong, Z.~Liu, Y.~Pan, Y.~Zhang, Y.~Zhou, S.~Liang, Z.~Wu, Y.~Lyu, P.~Shu, and X.~Yu.
\newblock Evaluation of {OpenAI} o1: Opportunities and challenges of {AGI}.
\newblock {\em arXiv preprint arXiv:2409.18486}, 2024.

\bibitem[ZSH{\etalchar{+}}22]{zhou2022least}
D.~Zhou, N.~Sch{\"a}rli, L.~Hou, J.~Wei, N.~Scales, X.~Wang, D.~Schuurmans, C.~Cui, O.~Bousquet, and Q.~Le.
\newblock Least-to-most prompting enables complex reasoning in large language models.
\newblock {\em arXiv preprint arXiv:2205.10625}, 2022.

\end{thebibliography}
\stopcontents

\newpage

\appendix

\allowdisplaybreaks
\newpage

\begin{center}
 \LARGE  \bf Appendix: Complete Proofs
\end{center}

\startcontents[sections]
{
\hypersetup{linkcolor=blue}
\printcontents[sections]{l}{1}{\setcounter{tocdepth}{2}}
}

\section{Learning In-Context Retrieval of Variables}\label{sec:icl-retrieval}

In this section, we focus on the learning process of $\Wb_4$ for the task $\cT^1$. For this task, $\Wb_4$ should predict the $4$-th token in the answer clause $Z_{\ans,1}$, which is the variable $x_1$. We will show that, the network learns to  retrieve the target variable $x_1$ from the first token of the first predicate clause $Z_{\pred,1}$ to make an accurate prediction. Throughout the rest of the proof, we will omit the subscript for the expectation $\mathbb{E}$, when the context is clear.

\subsection{Preliminaries}
First we define some notations for the presentation of gradients.
\paragraph{Notations for gradient expressions}
For each \(i\in[5],\ell\in [L],j\in[d],r\in [m]\), we denote
\begin{align}
    & \Ecal_{i,j}(\Zb^{L,\ell-1}) \triangleq \1_{\Z_{\ans,\ell,i} = e_j} - \mathbf{logit}_{i,j}(F,\Z^{L,\ell-1}),\label{eq-def-Ecal-icl}\\&\Lambda_{i, j,r}(\Zb^{L,\ell-1})\triangleq\sum_{\mathbf{k} \in \mathcal{I}^{L, \ell-1}} \attn_{{\ans,\ell-1} \rightarrow \kk}\cdot\big\langle \Wb_{i,j, r}, \mathbf{Z}_{\mathbf{k}}\big\rangle +b_{i,j,r} . \label{eq-def-Lambda-icl}
 \end{align}
    where \(\mathbf{logit}_{i,j}(F,\Z^{{L,\ell-1}})\) are defined as 
    \begin{align*}
        \mathbf{logit}_{i,j}\big(F,\Z^{{L,\ell-1}}\big) : = \frac{e^{F_{i,j}\big({\Z^{L,\ell-1}}\big)}}{\sum_{j^{\prime}\in[d]} e^{F_{i,j^{\prime}}\big({\Z^{L,\ell-1}}\big)}}.
    \end{align*}

\begin{fact}\label{fact-icl-gd} 
For any $i\in [5]$, $j\in[d]$, $r\in [m]$, we have the following gradient expression:
    \begin{align*}
        -\nabla_{ \Wb_{i,j,r}}\Loss^{L} &= \mathbb{E}\Big[\sum_{\ell=1}^L \Ecal_{i,j}(\Zb^{L,\ell-1})\ReLU^{\prime}\big(\Lambda_{i, j,r}(\Zb^{L,\ell-1})\big)\sum_{\kk\in\cI^{L,\ell-1}}\attn_{{\ans,\ell-1}\to {\kk}}\Zb_\kk\Big]. 
    \end{align*}
\end{fact}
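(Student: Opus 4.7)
The plan is to derive the claimed gradient expression by routine chain-rule calculus on the softmax cross-entropy loss, after carefully identifying which parameters feed into which per-token prediction. First I would decompose $\Loss^L = \sum_{\ell=1}^L \Loss^{L,\ell}$ via \Cref{def:next-clause-loss}, and use the product factorization $p_F = \bigotimes_{i'=1}^5 p_{F_{i'}}$ from \Cref{def:transformer-arch} to rewrite each stage loss as $\Loss^{L,\ell} = \sum_{i'=1}^5 \Loss^{L,\ell}_{i'}$. Since $\Wb_{i,j,r}$ enters the network only through $F_i$ (see \eqref{eq-def-F-main-1}), among the $5L$ per-token losses only $\{\Loss^{L,\ell}_i\}_{\ell=1}^{L}$ contribute, so $\nabla_{\Wb_{i,j,r}} \Loss^{L} = \sum_{\ell=1}^L \nabla_{\Wb_{i,j,r}} \Loss^{L,\ell}_i$.

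Next I would apply the standard softmax cross-entropy identity to each term: differentiating $-\log p_{F_i}(Z_{\ans,\ell,i}\mid Z^{L,\ell-1})$ with respect to the logit $[F_i]_{j'}$ yields $\mathbf{logit}_{i,j'}(F,\Zb^{L,\ell-1}) - \mathbf{1}_{\tau(Z_{\ans,\ell,i}) = j'}$. Among these coordinates only $j' = j$ depends on $\Wb_{i,j,r}$, since \eqref{eq-def-F-main-2} shows $[F_i]_{j'} = \sum_{r'} \mathbf{sReLU}(\Lambda_{i,j',r'})$ and the block $\Wb_{i,j,\cdot}$ appears only in $[F_i]_j$. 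This produces precisely the factor $\Ecal_{i,j}(\Zb^{L,\ell-1})$ from \eqref{eq-def-Ecal-icl} (the minus sign is absorbed into the negative gradient).

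Then I would push the chain rule through the FFN: $\partial_{\Wb_{i,j,r}}[F_i]_j = \mathbf{sReLU}'(\Lambda_{i,j,r})\,\partial_{\Wb_{i,j,r}} \Lambda_{i,j,r}$. The pre-activation $\Lambda_{i,j,r}(\Zb^{L,\ell-1})$ in \eqref{eq-def-Lambda-icl} is affine in $\Wb_{i,j,r}$ because the attention weights $\attn_{\ans,\ell-1 \to \kk}$ depend only on $\Qb$ (not $\Wb$), so $\partial_{\Wb_{i,j,r}} \Lambda_{i,j,r} = \sum_{\kk \in \cI^{L,\ell-1}} \attn_{\ans,\ell-1 \to \kk}\,\Zb_{\kk}$. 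Assembling the three factors and summing over $\ell \in [L]$ yields the displayed formula.

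There is no serious obstacle here; the calculation is the standard log-likelihood gradient for a multinomial-softmax model composed with an attention–MLP head. Two minor conventions are worth flagging in a formal writeup: (i) the $\ReLU'$ appearing in the statement should be read as $\mathbf{sReLU}'$ in accordance with \Cref{def:smooth-relu} and \eqref{eq-def-F-main-2}; and (ii) the logit-clipping of \Cref{assumption:output-bound} is almost-everywhere differentiable, and in the training regimes analyzed later the logits remain well below $B = C_B \log d$, so the clip is inactive and does not alter the gradient expression on the relevant event.
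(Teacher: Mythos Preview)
Your proposal is correct and is exactly the routine chain-rule derivation one would expect; the paper itself states this as a Fact without proof, treating it as a standard softmax cross-entropy gradient computation. Your remarks on the $\mathbf{sReLU}'$ notation and the inactive clipping are appropriate caveats and consistent with how the paper uses this expression downstream.
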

\noindent For simplicity of notation, we will henceforth denote $\Lambda_{i, j, r}(\Zb^{L, \ell - 1})$ by $\Lambda_{i, j, r}$ and $\Ecal_{i,j}(\Zb^{L, \ell - 1})$ by $\Ecal_{i,j}$ when the context is clear.

Given $\Z^{L}$, we use $\hat{\X}^{L}$ to denote the appeared variables in the context clauses, i.e. $\hat{\X}^{L} = \{x_0,x_1,\dots,x_L\}$. We write $\hat{\X}^{L}$ as $\hat{\X}$ for simplicity. Throughout this section, we write $[F_{i}]_j$ as $F_{i,j}$ for simplicity.

\subsection{Induction Hypothesis}

\paragraph{Proof Plan.} 
The main idea is to track the  dynamics of different types of  weights $\Wb_{4,j,r,p}$. Specifically, we prove that for each $j\in\tau(\X)$, there exists certain neurons $r\in[m]$ such that the corresponding weights $\Wb_{4,j,r,1}$ grow significantly along the direction $e_{j}$, while all others remain small. Specifically, we proceed in four steps:
\begin{enumerate}
    \item 
    For $j \in \tau(\mathcal{X})$, define
    \[
    \Gamma^{(t)}_{4,j} \triangleq \max_{r \in [m]} \langle \Wb^{(t)}_{4,j,r,1}, e_j\rangle + \sigma_0 \log d,
    \]
    to track the maximal activation associated with retrieving the correct variable token.

    \item \textbf{Establish rapid growth (early phase).}  
    Let $\Lambda^{-} = \Theta(1/m)$, and define the hitting time
    \[
    T_{1,j} \triangleq \min\{t>0 : \Gamma^{(t)}_{4,j} \geq \Lambda^{-}\}.
    \]
    We show that for iterations $t \leq T_1 = \Theta(d\sigma_0^{q-2}/\eta)$, the diagonal weights $\langle \Wb^{(t)}_{4,j,r,1}, e_j\rangle$ grow rapidly, causing $\Gamma^{(t)}_{4,j}$ to enter a linear growth regime. Simultaneously, the model confidently identifies the correct variable, indicated by $1 - \logit_{4,\tau(x_1)}^{(t)} = 1 - o(1)$.

    \item \textbf{Convergence via dominant neurons (late phase).}  
    For each $j\in\tau(\X)$, define active neuron sets and their total activation as:
    \[
    \cA_{4,j}^{(t)} \triangleq \{r \in [m] : \langle \Wb^{(t)}_{4,j,r,1}, e_j\rangle \geq \varrho\log d\}, \quad \Phi_{4,j}^{(t)}\triangleq\sum_{r\in\cA_{4,j}^{(t)}}\langle \Wb^{(t)}_{4,j,r,1}, e_j\rangle.
    \]
    For iterations $t > T_1$, we analyze the refined dynamics, proving that the total diagonal activation $\Phi^{(t)}_{4,j^\star}$, for the weakest activated variable $j^\star$, eventually grows to $\Theta(\log d)$, ensuring successful learning.

    \item \textbf{Bounding non-target correlations.}  
    We finally show by induction that all other correlations $\langle \Wb^{(t)}_{4,j,r,p}, e_s\rangle$, including group actions, value tokens, off-diagonal tokens, and non-target variables, remain negligible throughout the training process. 
\end{enumerate}
Our proof begins by positing an induction hypothesis expected to hold
throughout training. We then analyze the dynamics under this hypothesis and
show that it remains valid along the entire training trajectory, establishing
the claim at convergence.

\begin{induction}\label{indutction-1.1.1}
   For  $t\leq T=\frac{\poly d}{\eta}$, all of the following holds:
    \begin{enumerate}[(a).]
        \item for $j\in\tau(\X)$, $\tilde{\Omega}(\sigma_0)\leq  \langle \Wb^{(t)}_{4,j,r,1}, e_{j}\rangle+\mu \leq \tilde{O}\big(1\big)$, where $\langle \Wb^{(t)}_{4,j,r,1}, e_{j}\rangle$ is non-decreasing; 
        \item for $j\in\tau(\X)$, $g\in\cG$$$\big|\langle\Wb^{(t)}_{4,j,r,2}, e_{\tau(g)}\rangle\big|\leq \tilde{O}(\sigma_0)+ O\big(\frac{1}{|\cG|}\big)     \max\Big\{\langle \Wb^{(t)}_{4,j,r,1}, e_{j}\rangle, \min_{r^{\prime}\in\cA_{4,j^*}^{(t)}}\langle \Wb^{(t)}_{4,j^*,r^{\prime},1}, e_{j^*}\rangle\Big\};$$ 
        \item for $j\in\tau(\X)$, $y\in\cY$ $$\big|\langle\Wb^{(t)}_{4,j,r,5}, e_{\tau(y)}\rangle\big|\leq \tilde{O}(\sigma_0)+ O\big(\frac{1}{|\cY|}\big) \max\Big\{\langle \Wb^{(t)}_{4,j,r,1}, e_{j}\rangle, \min_{r^{\prime}\in\cA_{4,j^*}^{(t)}}\langle \Wb^{(t)}_{4,j^*,r^{\prime},1}, e_{j^*}\rangle\Big\};$$ 
        \item else, $\big|\langle \Wb^{(t)}_{4,j,r,p}, e_{j'}\rangle\big| \leq \tilde{O}(\sigma_0)$ for any other $j, j'\in[d]$.
    \end{enumerate}
\end{induction}

\begin{claim}\label{clm-lambda}
     If \Cref{indutction-1.1.1} holds at iteration $t$, then for an input $\Zb^{1,0}$, we have 
     \begin{enumerate}
         \item if $j=\tau(x_1)$, $$\Lambda^{(t)}_{4,j,r}=\frac{1}{2}\langle \Wb^{(t)}_{4,j,r,2}, e_{j}\rangle+\frac{1}{2}\langle\Wb^{(t)}_{4,j,r,2}, e_{\tau(g_1)}\rangle+\frac{1}{2}\langle\Wb^{(t)}_{4,j,r,5}, e_{\tau(y_0)}\rangle+ \frac{5}{2}\mu+\tilde{O}(\sigma_0);$$
         \item else if $j\in\tau(\X\setminus\{x_1\}) $, 
         $$\Lambda^{(t)}_{4,j,r}=\frac{1}{2}\langle\Wb^{(t)}_{4,j,r,2}, e_{\tau(g_1)}\rangle+\frac{1}{2}\langle\Wb^{(t)}_{4,j,r,5}, e_{\tau(y_0)}\rangle+ \frac{5}{2}\mu+\tilde{O}(\sigma_0);$$
         \item otherwise, $0 \leq \Lambda^{(t)}_{4,j,r}\leq  \frac{5}{2}\mu+\tilde{O}(\sigma_0)$.
     \end{enumerate}
\end{claim}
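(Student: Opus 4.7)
The plan is to prove \Cref{clm-lambda} by a direct algebraic calculation: substitute the Stage-1 attention pattern, decompose $\Wb_{4,j,r}$ into its five token-slot blocks, and classify each resulting inner product under \Cref{indutction-1.1.1} as either retained explicitly or absorbed into the $\tilde{O}(\sigma_0)$ remainder. Two structural facts make the reduction immediate. First, throughout Stage~1 of \Cref{alg:cot-transitive-training} the attention parameter is frozen at $\Qb^{(0)} = \mathbf{0}$, so the softmax over the two available key clauses yields the uniform weights $\attn_{\ans,0\to\pred,1} = \attn_{\ans,0\to\ans,0} = 1/2$. Second, by \Cref{def:clause-representation} and \Cref{def:token-embedding}, blank tokens embed to the zero vector, so only the three non-blank slots of $\Zb_{\pred,1}$ and the two non-blank slots of $\Zb_{\ans,0}$ contribute. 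Substituting into \eqref{eq-def-Lambda-icl} produces the exact expansion
\begin{align*}
\Lambda^{(t)}_{4,j,r}(\Zb^{1,0})
= \tfrac{1}{2}\Bigl(
&\langle \Wb^{(t)}_{4,j,r,1}, e_{\tau(x_1)}\rangle
+ \langle \Wb^{(t)}_{4,j,r,2}, e_{\tau(g_1)}\rangle
+ \langle \Wb^{(t)}_{4,j,r,3}, e_{\tau(x_0)}\rangle \\
&+ \langle \Wb^{(t)}_{4,j,r,4}, e_{\tau(x_0)}\rangle
+ \langle \Wb^{(t)}_{4,j,r,5}, e_{\tau(y_0)}\rangle
\Bigr) + b_{4,j,r}.
\end{align*}

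The claim then follows by a case split on $j$. When $j = \tau(x_1)$, the slot-$1$ inner product equals $\langle \Wb^{(t)}_{4,j,r,1}, e_j\rangle$ and is kept by part~(a); slots $3$ and $4$ carry $e_{\tau(x_0)}$ with $\tau(x_0)\neq j$, so part~(d) bounds them by $\tilde{O}(\sigma_0)$; slots $2$ and $5$ must be left explicit because parts~(b) and (c) only bound them by a quantity proportional to $1/|\cG|$ or $1/|\cY|$ times the current diagonal mass, which may well exceed $\tilde{O}(\sigma_0)$. This reproduces the first case of \Cref{clm-lambda} after reading the leading term $\langle \Wb^{(t)}_{4,j,r,2}, e_j\rangle$ as a typo for $\langle \Wb^{(t)}_{4,j,r,1}, e_j\rangle$. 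When $j \in \tau(\X\setminus\{x_1\})$, slots $1,3,4$ all carry off-target variables and collapse into $\tilde{O}(\sigma_0)$ via part~(d), leaving only the explicit slot-$2$ and slot-$5$ contributions. When $j \notin \tau(\X)$, every inner product falls under part~(d) (or, for the $\mathcal{G}$/$\mathcal{Y}$ slots, under the $\tilde{O}(\sigma_0)$ leading term of parts~(b)–(c)) and is therefore $\tilde{O}(\sigma_0)$; nonnegativity follows because the bias $b_{4,j,r} = \sigma_0 \log d$ dominates these negligible remainders.

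No dynamical argument is required: the claim is a purely algebraic consequence of uniform Stage-1 attention together with the snapshot of the FFN weights summarized by \Cref{indutction-1.1.1}. The only non-mechanical piece of bookkeeping is tracking how the $\mu$ shifts appearing in parts~(a)–(c) propagate through the five slot-wise contributions (each multiplied by the $1/2$ attention weight) and combine with $b_{4,j,r}$ to produce the $\tfrac{5}{2}\mu + \tilde{O}(\sigma_0)$ tail stated in the claim; this is constant-counting and carries no analytical difficulty. Accordingly, the genuine obstacle in this section is not \Cref{clm-lambda} itself but rather maintaining \Cref{indutction-1.1.1} along the entire training trajectory, which is what the subsequent arguments must establish.
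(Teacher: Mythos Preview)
Your proposal is correct and follows the only natural route: expand \eqref{eq-def-Lambda-icl} with the Stage-1 uniform attention, decompose by slot, and classify each inner product via \Cref{indutction-1.1.1}. The paper in fact states \Cref{clm-lambda} without proof, so there is nothing to compare against; your reconstruction, including the identification of the slot-$1$/slot-$2$ typo in the first displayed term, is exactly what the surrounding lemmas (e.g.\ the usage in \Cref{lem-nontarget}) presuppose.

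Two minor remarks. First, for $j\notin\tau(\X)$ all five inner products fall directly under part~(d) of \Cref{indutction-1.1.1}; parts~(b)--(c) are only stated for $j\in\tau(\X)$, so your parenthetical invoking them is unnecessary. Second, your account of the $\tfrac{5}{2}\mu$ tail is slightly off: parts~(b)--(c) carry no $\mu$ shift, and a clean expansion yields $b_{4,j,r}=\mu$ plus $\tilde O(\sigma_0)$ remainders, not $\tfrac{5}{2}\mu$. The constant $\tfrac{5}{2}$ appears to be a loose upper bound in the paper (and is used only in inequalities like $-3\mu+\tfrac{5}{2}\mu\le 0$ in \Cref{lem-nontarget}, where any constant below $3$ suffices), so this discrepancy is harmless, but it is not explained by ``$\mu$ shifts in (a)--(c) propagating through five slots.''
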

\begin{claim}\label{clm-logit}
    If \Cref{indutction-1.1.1} holds at iteration $t$, then for an input $\Zb^{1,0}$, 
    \begin{enumerate}
        \item if $j=\tau(x_1)$,   $\logit^{(t)}_{4,j}=\frac{e^{O(\Phi^{(t)}_{4,j})}}{e^{O(\Phi^{(t)}_{4,j})}+d}$;
        \item otherwise,  $\logit^{(t)}_{4,j}=O\big(\frac{1}{d}\big)\Big(1-\logit^{(t)}_{\tau(x_1)}\Big)$.
    \end{enumerate}
\end{claim}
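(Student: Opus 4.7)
The plan is to push the preactivation estimates from \Cref{clm-lambda} through the smooth ReLU nonlinearity and then apply the softmax normalization in~\eqref{eqdef:p_F-distribution}. Concretely, for each $j\in[d]$ I will evaluate $F_{4,j}(\Zb^{1,0})=\sum_{r\in[m]}\srelu(\Lambda^{(t)}_{4,j,r})$ in the three regimes provided by \Cref{clm-lambda} and then feed the resulting logit sums into the softmax. The analysis is driven by the observation that, under \Cref{indutction-1.1.1}, the preactivations are tightly controlled and the sReLU regime at each neuron is unambiguous, so the sums collapse to explicit quantities.

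For the correct target $j=\tau(x_1)$, I split the neuron index set into the active set $\cA^{(t)}_{4,j}$, whose elements satisfy $\langle\Wb^{(t)}_{4,j,r,1},e_j\rangle\geq \varrho\log d$, and its complement. Combining this lower bound with the off-target bounds of \Cref{indutction-1.1.1}(b)--(d) yields $\Lambda^{(t)}_{4,j,r}>\varrho$ on $\cA^{(t)}_{4,j}$, so sReLU sits in its linear branch and contributes $\tfrac12\langle\Wb^{(t)}_{4,j,r,1},e_j\rangle$ up to a $\varrho(1-1/q)$ shift and lower-order error. Summing over $\cA^{(t)}_{4,j}$ then gives $\Theta(\Phi^{(t)}_{4,j})$, while the remaining neurons each contribute at most $\varrho/q$, which is negligible under $m\ll\log^2 d$. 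Hence $F_{4,\tau(x_1)}=\Theta(\Phi^{(t)}_{4,\tau(x_1)})$, matching the exponent shorthand in the first displayed identity of the claim.

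For every $j\neq\tau(x_1)$, the preactivation bound in \Cref{clm-lambda} together with \Cref{indutction-1.1.1}(b)--(d) forces $|\Lambda^{(t)}_{4,j,r}|=O(\mu+\sigma_0)<\varrho$, so every neuron lies in the polynomial branch of sReLU and contributes $O(\varrho/q)$. Summing over $m$ neurons yields $F_{4,j}=o(1)$, hence $e^{F_{4,j}}=1+o(1)$. Substituting into the softmax, the denominator equals $e^{F_{4,\tau(x_1)}}+(d-1)(1+o(1))=e^{O(\Phi^{(t)}_{4,\tau(x_1)})}+d$. This immediately gives the first part of the claim for $j=\tau(x_1)$; for any other $j$, the numerator is $O(1)$ while the denominator is the one just derived, so
$\logit^{(t)}_{4,j}=\tfrac{O(1)}{e^{O(\Phi^{(t)}_{4,\tau(x_1)})}+d}=\tfrac{O(1)}{d}\cdot\tfrac{d}{e^{O(\Phi^{(t)}_{4,\tau(x_1)})}+d}=O(1/d)\,\bigl(1-\logit^{(t)}_{4,\tau(x_1)}\bigr)$, matching the second statement.

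The main technical obstacle is bookkeeping: the off-target corrections from \Cref{indutction-1.1.1}(b)--(d), the bias $b=\sigma_0\log d$, the $\mu$-shift, and the $\tilde O(\sigma_0)$ residuals in \Cref{clm-lambda} must all remain strictly below $\varrho$ on inactive neurons and must be dominated by the diagonal contribution on active ones, so that the sReLU regime classification is unambiguous and the summation approximation is tight. This is precisely where the scalings $m\ll\log^2 d$, $\varrho=\Theta(1/\polylog d)$, $|\cG|,|\cY|\leq\polylog d$ from \Cref{assump:asymptotic-regime} and \Cref{assump:structure-1}/\ref{assump:structure-2}, and the clipping bound $B=C_B\log d$ from \Cref{assumption:output-bound} are invoked. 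Once these elementary comparisons are checked, both logit identities follow by direct algebra.
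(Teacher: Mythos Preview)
Your approach is essentially the same as the paper's: bound $F_{4,j}=\sum_r\srelu(\Lambda^{(t)}_{4,j,r})$ casewise via \Cref{clm-lambda} and \Cref{indutction-1.1.1}, then plug into the softmax. The only slip is your assertion that for $j\in\tau(\X)\setminus\{\tau(x_1)\}$ one has $|\Lambda^{(t)}_{4,j,r}|=O(\mu+\sigma_0)$. Induction items (b)--(c) only give
\[
|\langle\Wb^{(t)}_{4,j,r,2},e_{\tau(g)}\rangle|,\ |\langle\Wb^{(t)}_{4,j,r,5},e_{\tau(y)}\rangle|
\;\le\; \tilde O(\sigma_0)+O\!\Big(\tfrac{1}{|\cG|}\Big)\max\{\langle\Wb^{(t)}_{4,j,r,1},e_j\rangle,\ \cdots\},
\]
so $\Lambda^{(t)}_{4,j,r}$ can be as large as $O(\Phi^{(t)}_{4,j}/|\cG|)+\tfrac52\mu$, which is not $O(\mu+\sigma_0)$ once $\Phi$ has grown. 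The paper keeps this $O(\Phi/|\cG|)$ term explicitly, writing $F_{4,j}\le \tilde O(\sigma_0)+O(\max\{\Phi^{(t)}_{4,j},\Phi^{(t)}_{4,j^*}\}/|\cG|)$ for $j\in\tau(\X)\setminus\{\tau(x_1)\}$. This does not break your conclusion: the extra $O(\Phi/|\cG|)$ is uniform across such $j$, so the ratio $e^{F_{4,j}}/\sum_{j'\neq\tau(x_1)}e^{F_{4,j'}}$ is still $O(1/d)$, and in part~1 it is absorbed into the $O(\Phi^{(t)}_{4,j})$ exponent. Just correct the bound and the rest of your argument goes through unchanged.
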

\begin{proof}
If $j=\tau(x_1)$, by \Cref{indutction-1.1.1} and \Cref{clm-lambda},  we have 
      \begin{align*}
      0 \leq   F_{4,j}^{(t)}(\Zb^{1,0}) &\leq  \sum_{r\in [m]} [\Lambda^{(t)}_{4,j,r}]^{+}\leq \big(\Phi^{(t)}_{4,j}+O(\frac{\max\{\Phi^{(t)}_{4,j}, \Phi^{(t)}_{4,j^*}\}}{|\cG|})\big)+\tilde{O}(\sigma_0)+O(m\varrho\log d)\\
      &=\big(\Phi^{(t)}_{4,j}+O(\frac{\Phi^{(t)}_{4,j}}{|\cG|})\big)+\tilde{O}(\sigma_0)+O(\frac{1}{\polylog d}).
    \end{align*}
For $j\in\tau(\X)\not=\tau(x_1)$, $F_{4,j}^{(t)}(\Zb^{1,0})\leq \tilde{O}(\sigma_0)+O(\frac{\max\{\Phi^{(t)}_{4,j}, \Phi^{(t)}_{4,j^*}\}}{|\cG|})$; else $F_{4,j}^{(t)}(\Zb^{1,0})\leq \tilde{O}(\sigma_0)$. 
Combining them together, we complete the proof.
\end{proof}
\subsection{Gradient Lemma}
Starting with the gradient computation from \Cref{fact-icl-gd}: 
            \begin{align*}
         -\nabla_{ \Wb_{4,j,r,p}}\Loss^{1} &=\frac{1}{2} \mathbb{E}\Big[ \Ecal_{4,j}\ReLU^{\prime}\big(\Lambda_{4, j,r}\big)\sum_{\kk\in\cI^{1,0}}\Zb_{\kk, p}\Big],
    \end{align*}
we first consider the gradient for $j\in\tau(\X)$
\begin{lemma}\label{lem-grad-1}
    For $j\in\tau(\X)$, we have
    \begin{enumerate}[(a)]
        \item for \( \Wb_{4,j,r,1}\),  $s\in\tau(\X)$
        \begin{enumerate}[(1)]
            \item if $s=j$, 
         $\langle-\nabla_{ \Wb^{(t)}_{4,j,r, 1}}\Loss^{1} , e_{s}\rangle 
         =\frac{1}{2} \mathbb{E}\Big[ (1-\logit^{(t)}_{4,j})\ReLU^{\prime}\big(\Lambda^{(t)}_{4, j,r}\big)\1_{\tau(x_1)=j}  \Big] $;
    \item $s\not=j$, $\langle-\nabla_{ \Wb^{(t)}_{4,j,r, 1}}\Loss^{1} , e_{s}\rangle =\frac{1}{2} \mathbb{E}\Big[ 
        -\logit^{(t)}_{4,j}\ReLU^{\prime}\big(\Lambda^{(t)}_{4, j,r}\big)\1_{\tau(x_1)=s}\Big]$.
        \end{enumerate}
        \item for \( \Wb_{4,j,r,2}\), $s=\tau(g)$ for $g\in\cG$
        \begin{align*}
         \langle-\nabla_{ \Wb^{(t)}_{4,j,r, 2}}\Loss^{1} , e_{s}\rangle &=\frac{1}{2} \mathbb{E}\Big[ (1-\logit^{(t)}_{4,j})\ReLU^{\prime}\big(\Lambda^{(t)}_{4, j,r}\big)\1_{\tau(x_1)=j,g_1=g} \\
         &~~~~~~~~~~~~~~~~-\logit^{(t)}_{4,j}\ReLU^{\prime}\big(\Lambda^{(t)}_{4, j,r}\big)\1_{\tau(x_1)\not=j, g_1=g}\Big].
    \end{align*}
    \item for \( \Wb_{4,j,r,p}\) with $p\in\{3,4\}$, $s\in\tau(\X)$
    \begin{enumerate}[(1)]
        \item $s=j$, $\langle-\nabla_{ \Wb_{4,j,r, 3}}\Loss^{1} , e_{j}\rangle =\frac{1}{2} \mathbb{E}\Big[ 
        -\logit^{(t)}_{4,j}\ReLU^{\prime}\big(\Lambda^{(t)}_{4, j,r}\big)\1_{\tau(x_0)=j}\Big]$;
        \item $s\not= j $
        \begin{align*}
         \langle-\nabla_{ \Wb_{4,j,r,3}}\Loss^{1} , e_{s}\rangle   
         =&\frac{1}{2} \mathbb{E}\Big[ (1-\logit^{(t)}_{4,j})\ReLU^{\prime}\big(\Lambda^{(t)}_{4,j,r}\big)\1_{\tau(x_0)=s, \tau(x_1)=j}\\
         &~~~~~~~~~~~~~~~~~-\logit^{(t)}_{4,j}\ReLU^{\prime}\big(\Lambda^{(t)}_{4, j,r}\big)\1_{\tau(x_0)=s, j\not\in\tau(\hat{X})}\Big].
    \end{align*}
    \end{enumerate}
   \item  for \( \Wb_{4,j,r,5}\), $s=\tau(y)$ for $g\in\cY$
        \begin{align*}
         \langle-\nabla_{ \Wb^{(t)}_{4,j,r, 5}}\Loss^{1} , e_{s}\rangle 
         =&\frac{1}{2} \mathbb{E}\Big[ (1-\logit^{(t)}_{4,j})\ReLU^{\prime}\big(\Lambda^{(t)}_{4, j,r}\big)\1_{\tau(x_1)=j,y_0=y}\\&~~~~~~~~~~~~~~~~~~ -\logit^{(t)}_{4,j}\ReLU^{\prime}\big(\Lambda^{(t)}_{4, j,r}\big)\1_{\tau(x_1)\not=j, y_0=y}\Big].
    \end{align*}
    \end{enumerate}
\end{lemma}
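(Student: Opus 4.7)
The plan is to derive the gradient expressions by direct calculation from the chain-rule formula in \Cref{fact-icl-gd}, specialized to $L=1$, reading off the contribution of each block of $\Wb_{4,j,r}$ via the LEGO encoding. Starting with \Cref{fact-icl-gd} at $L=1$, we have
\begin{align*}
-\nabla_{\Wb_{4,j,r,p}}\Loss^{1}
=\tfrac{1}{2}\,\E\Big[\Ecal_{4,j}\cdot\ReLU'(\Lambda_{4,j,r})\sum_{\kk\in\cI^{1,0}}\Zb_{\kk,p}\Big],
\end{align*}
where $\cI^{1,0}=\{(\pred,1),(\ans,0)\}$. By \Cref{def:lego-encoding,def:clause-representation} the five token slots of $\Zb_{\pred,1}$ are $(e_{\tau(x_1)},e_{\tau(g_1)},e_{\tau(x_0)},\mathbf{0},\mathbf{0})$, and those of $\Zb_{\ans,0}$ are $(\mathbf{0},\mathbf{0},\mathbf{0},e_{\tau(x_0)},e_{\tau(y_0)})$. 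Slot-wise addition yields $e_{\tau(x_1)}, e_{\tau(g_1)}, e_{\tau(x_0)}, e_{\tau(x_0)}, e_{\tau(y_0)}$ for $p=1,\ldots,5$. Taking the inner product with $e_s$ and invoking orthonormality of the embeddings (\Cref{def:token-embedding}) reduces each inner product to an indicator that the corresponding token equals $s$.

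Next, observe that the 4th token of $\Zb_{\ans,1}$ is exactly $x_1$, so $\Ecal_{4,j}(\Zb^{1,0})=\1_{\tau(x_1)=j}-\logit^{(t)}_{4,j}$. I split every expectation on the event $\{\tau(x_1)=j\}$ versus its complement: on the former $\Ecal_{4,j}=1-\logit^{(t)}_{4,j}$, while on the latter $\Ecal_{4,j}=-\logit^{(t)}_{4,j}$. Combining this dichotomy with the five indicators above immediately gives (a), (b), (d): part (a) uses the indicator $\1_{\tau(x_1)=s}$, which on $\{s=j\}$ lies in the ``correct'' branch and on $\{s\neq j\}$ in the ``incorrect'' branch; part (b) uses $\1_{g_1=g}$, independent of $x_1$, so both branches survive; part (d) is analogous to (b) with $\1_{y_0=y}$.

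The one step requiring extra care is part (c). For both $p=3$ and $p=4$ the slot sum equals $e_{\tau(x_0)}$, so the gradients for the two blocks coincide; hence the single displayed expression. When $s=j$, the without-replacement sampling of $\{x_0,\ldots,x_L\}$ from $\cX$ in \Cref{assump:lego-data-distribution} forces $\tau(x_1)\neq\tau(x_0)=j$ almost surely, so the $(1-\logit^{(t)}_{4,j})$ branch vanishes and only $-\logit^{(t)}_{4,j}\cdot\1_{\tau(x_0)=j}$ survives. When $s\neq j$, I split $\{\tau(x_0)=s\}$ into $\{\tau(x_1)=j\}$ (correct branch) and $\{\tau(x_1)\neq j\}$ (incorrect branch); since for $\cT^1$ the context variables are $\hat{\X}=\{x_0,x_1\}$, conditioning on $\tau(x_0)=s\neq j$, the latter event is equivalent to $j\notin\tau(\hat{\X})$, reproducing exactly the indicators in (c)(2).

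I anticipate no conceptual obstacle beyond careful bookkeeping; the only non-mechanical step is the application of \Cref{assump:lego-data-distribution} in part (c) to collapse the ``correct'' branch when $s=j$ and to rewrite $\{\tau(x_1)\neq j\}$ as $\{j\notin\tau(\hat{\X})\}$ when $s\neq j$. All remaining identities follow by mechanical substitution of the clause encoding into the chain-rule formula.
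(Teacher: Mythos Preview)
Your proposal is correct and takes essentially the same approach as the paper: the paper states this lemma without an explicit proof, simply citing the specialized gradient formula from \Cref{fact-icl-gd} at $L=1$ (with uniform attention $1/2$) as the starting point, after which the result follows by the same slot-by-slot case analysis you outline. Your additional care in part (c)—invoking without-replacement sampling from \Cref{assump:lego-data-distribution} to kill the ``correct'' branch when $s=j$ and to rewrite $\{\tau(x_1)\neq j\}$ as $\{j\notin\tau(\hat{\X})\}$ when $s\neq j$—is exactly the argument implicit in the paper's displayed formulas.
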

Move on to $j\not\in\tau(\X)$, we can obtain
\begin{lemma}\label{lem-grad-2}
    For $j\not\in\tau(\X)$, we have 
        \begin{enumerate}[(a)]
        \item for \( \Wb_{4,j,r,1}\),  $s\in\tau(\X)$,  $$\langle-\nabla_{ \Wb^{(t)}_{4,j,r, 1}}\Loss^{1} , e_{s}\rangle =\frac{1}{2} \mathbb{E}\Big[ 
        -\logit^{(t)}_{4,j}\ReLU^{\prime}\big(\Lambda^{(t)}_{4, j,r}\big)\1_{\tau(x_1)=s}\Big].$$

        \item for \( \Wb_{4,j,r,2}\), $s=\tau(g)$ for $g\in\cG$,
       $$\langle-\nabla_{ \Wb^{(t)}_{4,j,r, 2}}\Loss^{1} , e_{s}\rangle =\frac{1}{2} \mathbb{E}\Big[ -\logit^{(t)}_{4,j}\ReLU^{\prime}\big(\Lambda^{(t)}_{4, j,r}\big)\1_{ g_1=g}\Big].$$
    \item for \( \Wb_{4,j,r,p}\) with $p\in\{3,4\}$, $s\in\tau(\X)$, $$\langle-\nabla_{ \Wb^{(t)}_{4,j,r, p}}\Loss^{1} , e_{j}\rangle =\frac{1}{2} \mathbb{E}\Big[ 
        -\logit^{(t)}_{4,j}\ReLU^{\prime}\big(\Lambda^{(t)}_{4, j,r}\big)\1_{\tau(x_0)=s}\Big].$$

   \item  for \( \Wb_{4,j,r,5}\), $s=\tau(y)$ for $g\in\cY$
        \begin{align*}
         &\langle-\nabla_{ \Wb^{(t)}_{4,j,r, 5}}\Loss^{1} , e_{s}\rangle =\frac{1}{2} \mathbb{E}\Big[-\logit^{(t)}_{4,j}\ReLU^{\prime}\big(\Lambda^{(t)}_{4, j,r}\big)\1_{ y_0=y}\Big].
    \end{align*}
    \end{enumerate}
\end{lemma}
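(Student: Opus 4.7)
The plan is to derive all four identities by direct unwinding of the gradient formula in \Cref{fact-icl-gd} specialized to the one-clause task $\cT^{1}$, where $L=1$ and $L'=0$. In this case $\cI^{1,0}=\{(\pred,1),(\ans,0)\}$, so the sum over $\kk$ contains only the two clauses $Z_{\pred,1}=(x_1,g_1,x_0,\blank,\blank)$ and $Z_{\ans,0}=(\blank,\blank,\blank,x_0,y_0)$. Because the attention parameter $\Qb$ stays at zero throughout Stage~1 (\Cref{assump:init}), the attention weights are uniform, so no softmax derivatives need to be carried; the only remaining nonlinearity is the $\ReLU'(\Lambda_{4,j,r})$ factor already inside the expectation.

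The key simplification for $j\notin\tau(\X)$ is that the supervised fourth token is $Z_{\ans,1,4}=x_1\in\X$, so $\tau(x_1)\in\tau(\X)$ and the indicator $\1_{Z_{\ans,1,4}=e_j}$ in $\Ecal_{4,j}$ (see \eqref{eq-def-Ecal-icl}) vanishes identically. Hence the error term collapses to $\Ecal_{4,j}=-\logit^{(t)}_{4,j}$, and the gradient from \Cref{fact-icl-gd} becomes
\[
-\nabla_{\Wb_{4,j,r,p}}\Loss^{1}\;=\;-\frac{1}{2}\,\E\Big[\logit^{(t)}_{4,j}\,\ReLU'\!\big(\Lambda^{(t)}_{4,j,r}\big)\sum_{\kk\in\cI^{1,0}}\Zb_{\kk,p}\Big].
\]
This expression is sign-definite: every component is a \emph{suppressive} pull on the spurious index $j$, which is exactly what is needed later to keep $\langle\Wb^{(t)}_{4,j,r,p},e_{j'}\rangle$ of order $\tilde{O}(\sigma_0)$ in \Cref{indutction-1.1.1}(d).

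The remaining step is to evaluate $\sum_{\kk}\Zb_{\kk,p}$ slot by slot, using \Cref{def:clause-representation} together with the convention $e_{\tau(\blank)}=\mathbf{0}$. Slot $p=1$ receives only $e_{\tau(x_1)}$ from $Z_{\pred,1}$; slot $p=2$ yields $e_{\tau(g_1)}$; slot $p=3$ yields $e_{\tau(x_0)}$; slot $p=4$ yields $e_{\tau(x_0)}$, now from $Z_{\ans,0}$; and slot $p=5$ yields $e_{\tau(y_0)}$. Pairing each with the probe direction $e_s$, $e_{\tau(g)}$, or $e_{\tau(y)}$, and invoking orthonormality of the non-blank embeddings (\Cref{def:token-embedding}), turns every inner product into the corresponding indicator: $\1_{\tau(x_1)=s}$ for (a), $\1_{g_1=g}$ for (b), $\1_{\tau(x_0)=s}$ for (c), and $\1_{y_0=y}$ for (d). This reproduces \Cref{lem-grad-2} verbatim.

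No substantive obstacle is anticipated here; the argument is strictly easier than the $j\in\tau(\X)$ case of \Cref{lem-grad-1} because the positive piece of $\Ecal_{4,j}$ has been killed outright. The only point requiring a moment's attention is that slots $p=3$ and $p=4$ both produce the indicator $\1_{\tau(x_0)=s}$, which is why \Cref{lem-grad-2}(c) treats $p\in\{3,4\}$ uniformly: the variable $x_0$ occupies the third token of the predicate clause and the fourth token of the answer clause simultaneously, so the two slots contribute identical test vectors and hence identical gradient coordinates.
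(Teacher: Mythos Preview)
Your proof is correct and follows exactly the approach the paper intends: the paper does not give a separate proof of \Cref{lem-grad-2} but places it immediately after the displayed specialization of \Cref{fact-icl-gd} to $\Loss^1$, treating both \Cref{lem-grad-1} and \Cref{lem-grad-2} as direct slot-by-slot evaluations of that formula. Your observation that $j\notin\tau(\X)$ kills the positive part of $\Ecal_{4,j}$ outright is precisely the simplification that distinguishes this case from \Cref{lem-grad-1}, and your slot-by-slot tabulation of $\sum_{\kk}\Zb_{\kk,p}$ is the same computation the paper relies on (note that the $e_j$ in the paper's statement of part (c) is a typo for $e_s$, as your derivation correctly yields $\1_{\tau(x_0)=s}$).
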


\subsection{Growth of Gamma}
    \begin{lemma}[Growth]\label{lem-1.1.1-growth}
        Given $j\in\tau(\X)$, suppose \Cref{indutction-1.1.1} holds at  iteration $t$, when $\Phi_{4,j}^{(t)}\leq 0.01\log d$ or $\Gamma_{4,j}^{(t)}\leq \frac{0.01 \log d}{m}$, then it satisfies
        \begin{align*}
         \Gamma_{4,j}^{(t+1)} =  \Gamma_{4,j}^{(t)}+\Theta\big(\frac{\eta}{d}\big) \ReLU^{\prime}(\Gamma_{4,j}^{(t)}).
        \end{align*}
    \end{lemma}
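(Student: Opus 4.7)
The plan is to track the argmax neuron and show both a matching lower and upper bound on $\Gamma^{(t+1)}_{4,j}-\Gamma^{(t)}_{4,j}$. Let $r^{\star}\in[m]$ achieve $\langle \Wb^{(t)}_{4,j,r^{\star},1}, e_j\rangle = \Gamma^{(t)}_{4,j}-\sigma_0\log d$. By Lemma~\ref{lem-grad-1}(a)(1), the diagonal update is
\begin{align*}
\langle -\nabla_{\Wb^{(t)}_{4,j,r^{\star},1}}\Loss^{1}, e_j\rangle
\;=\;\tfrac{1}{2}\,\mathbb{E}\!\left[(1-\logit^{(t)}_{4,j})\,\srelu'(\Lambda^{(t)}_{4,j,r^{\star}})\,\mathbf{1}_{\tau(x_1)=j}\right].
\end{align*}
Since $x_1$ is uniform over $\cX$ with $|\cX|=\Theta(d)$, the indicator contributes a factor $\Theta(1/d)$. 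The hypothesis $\Phi^{(t)}_{4,j}\leq 0.01\log d$ (which is implied by $\Gamma^{(t)}_{4,j}\leq 0.01\log d/m$ because $\Phi\leq m\Gamma$) combined with Claim~\ref{clm-logit} gives $\logit^{(t)}_{4,j}= O(d^{-0.99})$, so $1-\logit^{(t)}_{4,j}=\Theta(1)$.

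Next, conditioning on $\tau(x_1)=j$, Claim~\ref{clm-lambda} yields
\begin{align*}
\Lambda^{(t)}_{4,j,r^{\star}}
\;=\;\tfrac{1}{2}\langle \Wb^{(t)}_{4,j,r^{\star},1}, e_j\rangle
 + \tfrac{1}{2}\langle \Wb^{(t)}_{4,j,r^{\star},2}, e_{\tau(g_1)}\rangle
 + \tfrac{1}{2}\langle \Wb^{(t)}_{4,j,r^{\star},5}, e_{\tau(y_0)}\rangle
 + \tfrac{5}{2}\mu + \tilde{O}(\sigma_0).
\end{align*}
By parts (b)--(c) of Induction~\ref{indutction-1.1.1}, the two mixed-slot correlations are at most $\tilde{O}(\sigma_0)+O(\Gamma^{(t)}_{4,j}/n_y)=o(\Gamma^{(t)}_{4,j})$ because $n_y=\omega(1)$ under Assumptions~\ref{assump:structure-1} and~\ref{assump:structure-2}. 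Hence $\Lambda^{(t)}_{4,j,r^{\star}} = \tfrac{1}{2}\Gamma^{(t)}_{4,j}(1+o(1))$ up to additive $\tilde{O}(\sigma_0)+O(\mu)$ perturbations. Because $\srelu'$ is monotone non-decreasing and equals $(x/\varrho)^{q-1}$ inside $(-\varrho,\varrho]$ and $1$ for $x>\varrho$, a constant rescaling of the argument by $1/2$ shifts $\srelu'$ by at most the constant factor $2^{-(q-1)}$ in the polynomial regime and leaves it unchanged in the linear regime; the $\tilde{O}(\sigma_0)$ perturbation is absorbed since $\sigma_0\log d\ll \varrho$. Combining these estimates gives $\srelu'(\Lambda^{(t)}_{4,j,r^{\star}})=\Theta(\srelu'(\Gamma^{(t)}_{4,j}))$, and consequently
\begin{align*}
\langle \Wb^{(t+1)}_{4,j,r^{\star},1}, e_j\rangle
\;\geq\; \langle \Wb^{(t)}_{4,j,r^{\star},1}, e_j\rangle
 + \Theta\!\big(\tfrac{\eta}{d}\big)\,\srelu'(\Gamma^{(t)}_{4,j}),
\end{align*}
which yields the lower bound on $\Gamma^{(t+1)}_{4,j}$. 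For the matching upper bound, the same gradient formula applies to every neuron $r\in[m]$, and the induction hypothesis implies $\Lambda^{(t)}_{4,j,r}\leq \tfrac{1}{2}\Gamma^{(t)}_{4,j}(1+o(1))+\tilde{O}(\sigma_0)$, so by monotonicity of $\srelu'$ the update of every $\langle \Wb_{4,j,r,1},e_j\rangle$ is at most $O(\eta/d)\srelu'(\Gamma^{(t)}_{4,j})$, hence the maximum over $r$ advances by the same order.

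The main obstacle will be justifying the step $\srelu'(\Lambda^{(t)}_{4,j,r^{\star}})=\Theta(\srelu'(\Gamma^{(t)}_{4,j}))$ uniformly across the three regimes of the smoothed activation: we must handle the factor-of-$1/2$ attenuation, the additive $\tfrac{5}{2}\mu+\tilde{O}(\sigma_0)$ offset from the bias and initialization scale, and the $O(\Gamma/n_y)$ cross-talk controlled by Induction~\ref{indutction-1.1.1}(b)(c). This hinges on the choices $q=O(1)$ and $\varrho=\Theta(1/\polylog d)$ introduced in Definition~\ref{def:smooth-relu}, which ensure that the polynomial regime is narrow enough that both $\Gamma^{(t)}_{4,j}$ and $\tfrac{1}{2}\Gamma^{(t)}_{4,j}$ sit in the same regime (up to a constant-factor window) and the perturbations do not alter the regime. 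Once this comparison is in hand, the remainder of the argument is bookkeeping of the argmax, which is stable because every coordinate moves by at most the stated rate.
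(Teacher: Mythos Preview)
Your approach is correct and matches the paper's proof: invoke Lemma~\ref{lem-grad-1}(a)(1) for the diagonal gradient, use Claim~\ref{clm-logit} to get $1-\logit^{(t)}_{4,j}=\Theta(1)$ from the $\Phi\le 0.01\log d$ hypothesis, and pick up the $\Theta(1/d)$ factor from $\Pr[\tau(x_1)=j]$. The paper's proof is only three lines and leaves the comparison $\srelu'(\Lambda^{(t)}_{4,j,r^\star})=\Theta(\srelu'(\Gamma^{(t)}_{4,j}))$ entirely implicit, whereas you spell it out via Claim~\ref{clm-lambda} and the regime analysis of $\srelu'$; your more explicit treatment is sound and does not deviate from the intended argument.
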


    \begin{proof}
  By  \Cref{lem-grad-1}, we have 
                            \begin{align*}
         &\langle-\nabla_{ \Wb^{(t)}_{4,j,r, 1}}\Loss^{1} , e_{j}\rangle =\frac{1}{2} \mathbb{E}\Big[ (1-\logit^{(t)}_{4,j})\ReLU^{\prime}\big(\Lambda^{(t)}_{4, j,r}\big)\1_{\tau(x_1)=j}  \Big]. 
    \end{align*}
    By \Cref{clm-logit}, when $\Phi_{4,j}^{(t)}\leq 0.01\log d$, $\logit^{(t)}_{4,j}=\frac{O(e^{0.01\log d})}{O(e^{0.01\log d})+d}
    \ll 1$ when $j=\tau(x_1)$; and combing with the fact that the event $\{\tau(x_1)=j\}$ happens with probability $\frac{1}{|\cX|}$, we complete the proof.
    \end{proof}
    \Cref{lem-1.1.1-growth}, combined with the growth of the tensor power method, immediately gives the following corollary. 
    \begin{lemma}\label{lem-time}
         Suppose \Cref{indutction-1.1.1} holds for all iterations. Define threshold  $\Lambda^{-}=\Theta(\frac{1}{m})$. Let $T_{1,j}$ be the first iteration so that $\Gamma_{4,j}^{(t)}\geq \Lambda^{-}$, and $T_1\stackrel{\text{def}}{=}\Theta(\frac{d}{\eta\sigma_0^{q-2}})$. Then we have $T_{1}\geq T_{1,j}$ for every $j\in \tau(\X)$, i.e., for $t\geq T_{1}$, it satisfies $\Gamma_{4,j}^{(t)}\geq \Lambda^{-}$.
    \end{lemma}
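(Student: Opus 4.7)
The plan is to apply the growth recursion from \Cref{lem-1.1.1-growth} iteratively and bound the hitting time via a discretized tensor-power-method (TPM) argument. Since $\Lambda^{-}=\Theta(1/m)\le 0.01\log d/m$ (with the $\Theta$ constant chosen suitably small), the triggering condition $\Gamma_{4,j}^{(t)}\le 0.01\log d/m$ of \Cref{lem-1.1.1-growth} is satisfied automatically for every $t<T_{1,j}$, so the recursion
\[
\Gamma_{4,j}^{(t+1)} \;=\; \Gamma_{4,j}^{(t)}+\Theta\!\left(\tfrac{\eta}{d}\right)\mathbf{sReLU}'\!\big(\Gamma_{4,j}^{(t)}\big)
\]
is valid throughout $[0,T_{1,j})$, with no feedback coupling from $\Phi_{4,j}^{(t)}$ required.

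First I would pin down the initial value $\Gamma_{4,j}^{(0)}$. By \Cref{assump:init}, the $\sigma_0\log d$ shift built into $\Gamma_{4,j}$ dominates the random maximum $\max_r\langle\Wb_{4,j,r,1}^{(0)},e_j\rangle=O(\sigma_0\sqrt{\log m})$ (since $m\ll\log^2 d$, so $\sqrt{\log m}\ll\log d$), and therefore $\Gamma_{4,j}^{(0)}=\tilde\Theta(\sigma_0)$ holds up to constants and uniformly over $j\in\tau(\X)$, after a union bound that only costs a logarithmic factor.

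Next I would decompose the trajectory along the piecewise structure of $\mathbf{sReLU}'$ from \Cref{def:smooth-relu}. In the \emph{polynomial regime} $\Gamma\in(\tilde\Theta(\sigma_0),\varrho]$, $\mathbf{sReLU}'(\Gamma)=(\Gamma/\varrho)^{q-1}$, and the recursion becomes the canonical TPM step $\Gamma^{(t+1)}=\Gamma^{(t)}+\Theta\!\big(\tfrac{\eta}{d\varrho^{q-1}}\big)(\Gamma^{(t)})^{q-1}$. Integrating the continuous-time analogue $\dot\Gamma\propto\Gamma^{q-1}$ via $\int\Gamma^{-(q-1)}d\Gamma$ yields a hitting time of order $\tilde O\!\big(\tfrac{d\varrho^{q-1}}{\eta\sigma_0^{q-2}}\big)=\tilde O\!\big(\tfrac{d}{\eta\sigma_0^{q-2}}\big)$, since $\varrho=\Theta(1/\polylog d)$ and $q=O(1)$; the discretization error per step is $O(\eta/d)$ and is absorbed because $\eta=1/\poly(d)$. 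In the subsequent \emph{linear regime} $\Gamma>\varrho$, $\mathbf{sReLU}'=1$ and $\Gamma$ climbs from $\varrho$ to $\Lambda^{-}=\Theta(1/m)$ at rate $\Theta(\eta/d)$, requiring only $O(d\Lambda^{-}/\eta)=\tilde O(d/\eta)$ iterations, a strictly lower-order contribution.

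The main conceptual obstacle is that \Cref{lem-1.1.1-growth} provides only a pointwise growth identity conditional on \Cref{indutction-1.1.1}, whose closure over the entire training window must be established separately; here we simply import the present lemma's standing assumption that \Cref{indutction-1.1.1} holds for all iterations, reducing the argument to the deterministic TPM calculation above. Chaining the polynomial and linear phases gives $T_{1,j}\le\tilde O\!\big(d/(\eta\sigma_0^{q-2})\big)$, and since this bound is uniform in $j$, choosing $T_1\coloneqq\Theta(d/(\eta\sigma_0^{q-2}))$ with a sufficiently large hidden constant guarantees $T_1\ge T_{1,j}$ simultaneously for every $j\in\tau(\X)$, as required.
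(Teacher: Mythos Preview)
Your proposal is correct and follows essentially the same approach as the paper: the paper's proof of this lemma is a one-line remark that \Cref{lem-1.1.1-growth} combined with the tensor-power-method growth rate immediately yields the bound, and your write-up is simply a more explicit unpacking of that TPM calculation (initial scale $\tilde\Theta(\sigma_0)$, polynomial-regime hitting time $\tilde O(d/(\eta\sigma_0^{q-2}))$, then a lower-order linear phase).
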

\begin{lemma}[Upper bound]\label{lem-phi}
    Suppose \Cref{indutction-1.1.1} holds for all iterations $< t$, we have $\Phi_{4,j}^{(t)}\leq \tilde{O}(1)$, for $j\in\tau(\X)$. 
\end{lemma}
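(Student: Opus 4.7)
The plan is a two-regime potential argument on $\Phi^{(t)}_{4,j}$ that exploits the self-regulating feedback of the cross-entropy loss: by Lemma~\ref{lem-grad-1}(a), the growth of each active correlation $\langle \Wb^{(t)}_{4,j,r,1}, e_j\rangle$ is proportional to $(1-\logit^{(t)}_{4,j})$, and Claim~\ref{clm-logit} shows that this factor decays exponentially in $\Phi^{(t)}_{4,j}$. Hence once $\Phi^{(t)}_{4,j}$ crosses a threshold $\Phi^{\star} := C_{\Phi}\log d$ for a sufficiently large constant $C_{\Phi}$, further growth is negligible for the remainder of the $T=\poly(d)/\eta$ training horizon, giving the desired $\tilde O(1)$ bound.

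More concretely, I would first sum the gradient update over $r\in\cA^{(t)}_{4,j}$. The activation condition $\langle \Wb^{(t)}_{4,j,r,1}, e_j\rangle\geq \varrho\log d$ combined with Claim~\ref{clm-lambda} forces $\Lambda^{(t)}_{4,j,r}\geq \varrho$ on the event $\{\tau(x_1)=j\}$, so $\ReLU'(\Lambda^{(t)}_{4,j,r})=1$ there. Using $\Pr[\tau(x_1)=j]=1/|\cX|$ yields the per-step bound
$$\Phi^{(t+1)}_{4,j}-\Phi^{(t)}_{4,j} \;\leq\; \frac{\eta m}{2|\cX|}\bigl(1-\logit^{(t)}_{4,j}\bigr) \;+\; \Delta^{(t)}_{\mathrm{new}},$$
where $\Delta^{(t)}_{\mathrm{new}}$ collects the contribution from neurons freshly crossing $\varrho\log d$ and entering $\cA^{(t+1)}_{4,j}$. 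In the super-threshold regime $\Phi^{(t)}_{4,j}>\Phi^{\star}$, Claim~\ref{clm-logit} gives $1-\logit^{(t)}_{4,j}\leq d^{\,1-\Omega(C_{\Phi})}$, so the cumulative first-term contribution across all $T$ iterations is bounded by $\poly(d)\cdot d^{-\Omega(C_{\Phi})}=o(1)$ once $C_{\Phi}$ is taken larger than the polynomial exponent in $T$. Since Induction~\ref{indutction-1.1.1}(a) asserts the weights are non-decreasing, each of the $m\leq \polylog d$ neurons enters $\cA^{(t)}_{4,j}$ at most once, giving $\sum_{t}\Delta^{(t)}_{\mathrm{new}}\leq m\bigl(\varrho\log d + O(\eta)\bigr)=\tilde{O}(1)$.

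Combining these bounds yields $\Phi^{(t)}_{4,j}\leq \Phi^{\star} + o(1) + \tilde{O}(1) = \tilde{O}(1)$ for every $t\leq T$, which proves the lemma. The main obstacle is the careful bookkeeping of the boundary ``entry jumps'' in $\Phi^{(t)}_{4,j}$: when a neuron's weight first exceeds $\varrho\log d$, its full magnitude is added to the sum abruptly, and one must verify that such discrete events cumulatively contribute only $\tilde{O}(1)$ across the whole trajectory --- a bound that rests on the non-decreasing weight property from Induction~\ref{indutction-1.1.1}(a) and on $|\cA^{(t)}_{4,j}|\leq m$. A secondary subtlety is justifying the pointwise use of Claim~\ref{clm-logit} inside the expectation over $\Zb^{1,0}$ restricted to $\{\tau(x_1)=j\}$; this follows because Induction~\ref{indutction-1.1.1}(b)--(d) forces the non-diagonal correlations entering the logit's exponent to remain lower order, so the $O(\Phi^{(t)}_{4,j})$ exponent is robust to the realized draws of $g_1$, $y_0$, and the off-target variables.
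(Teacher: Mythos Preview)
Your proposal is correct and follows essentially the same approach as the paper: both arguments exploit that the per-step growth of $\Phi^{(t)}_{4,j}$ is damped by $1-\logit^{(t)}_{4,j}$, which decays exponentially in $\Phi^{(t)}_{4,j}$, so that once a threshold is crossed the remaining growth over $T=\poly(d)/\eta$ iterations is negligible, and the ``entry jumps'' into $\cA^{(t)}_{4,j}$ contribute at most $m\cdot O(\varrho\log d)=o(1)$. The only noteworthy difference is the threshold choice: the paper uses $\Phi^\star=\Theta(\log^{1.5} d)$ rather than your $C_\Phi\log d$, which yields a super-polynomial decay $e^{-\Omega(\log^{1.5} d)}$ and thereby avoids having to tune $C_\Phi$ against the unspecified polynomial degree in $T$; this is a minor robustness point, not a substantive departure.
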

    \begin{proof}
    We only need to consider the time $t\geq T_1$.  
    Notice that the gradient descent update in \Cref{lem-grad-1} gives
                            \begin{align*}
         &\langle-\nabla_{ \Wb^{(t)}_{4,j,r, 1}}\Loss^{1} , e_{j}\rangle 
         =\frac{1}{2} \mathbb{E}\Big[ (1-\logit^{(t)}_{4,j})\ReLU^{\prime}\big(\Lambda^{(t)}_{4, j,r}\big)\1_{\tau(x_1)=j}  \Big] 
    \end{align*}
    Therefore, for sufficiently small $\eta$, we have
    \begin{align*}
        \Phi_{4,j}^{(t+1)}&= \Phi_{4,j}^{(t)}+\sum_{r\in\cA_{4,j}^{(t)}}\frac{\eta}{2} \mathbb{E}\Big[ (1-\logit^{(t)}_{4,j})\ReLU^{\prime}\big(\Lambda^{(t)}_{4, j,r}\big)\1_{\tau(x_1)=j}  \Big]+O(\varrho\log d)\cdot |\cA_{4,j}^{(t+1)}\setminus\cA_{4,j}^{(t)}| \\
        &=\Phi_{4,j}^{(t)}+\sum_{r\in\cA_{4,j}^{(t)}}\frac{\eta}{2} \mathbb{E}\Big[ (1-\logit^{(t)}_{4,j})\ReLU^{\prime}\big(\Lambda^{(t)}_{4, j,r}\big)\1_{\tau(x_1)=j}  \Big]+\frac{1}{\polylog d}.
    \end{align*}
    When there exists $\tilde{T}$, s.t., $ \max _{j\in\tau(\X)}\Phi_{4,j}^{(\tilde{T})}>\Omega(\log^{1.5}d)$, by \Cref{indutction-1.1.1} and \Cref{clm-lambda}, given an input sequence $\Zb^{1,0}$ with $\tau(x_1)=\tilde{j}= \arg\max _{j\in\tau(\X)}\Phi_{4,j}^{(\tilde{T})}$, we have   
    \begin{align*}
        F_{4,j}^{(\tilde{T})}(\Zb^{1,0}) \geq  \sum_{r\in\cA_{4,\tilde{j}}^{(t)}} \Lambda^{(\tilde{T})}_{4,\tilde{j},r}\geq \bigg(1-O\Big(\frac{1}{|\cG|}\Big)\bigg)\Phi^{(\tilde{T})}_{4,\tilde{j}}-\tilde{O}(\sigma_0)>\Omega(\log^{1.5}d).
    \end{align*}
    Following the similar analysis as \Cref{clm-logit}, $ F_{4,j^{\prime}}^{(\tilde{T})}(\Zb^{1,0})\leq O(\frac{\Phi^{(\tilde{T})}_{4,j^{\prime}}}{|\cG|})$ for other $j^{\prime}\in\tau(\X)$, and $F_{4,j^{\prime}}^{(\tilde{T})}(\Zb)\leq o(1)$ for $j^{\prime}\notin\tau(\X)$, which implies $1-\logit_{4,j}^{(\tilde{T})}=e^{-\Omega(\log^{1.5}d)}$. Therefore, we derive that for $t\in [\tilde{T}+1, \frac{\poly d}{\eta})$,
    \begin{align*}
        \Phi_{4,j}^{(t)}\leq \Phi^{(\tilde{T})}_{4,j}+\tilde{O}(\poly d\cdot e^{-\Omega(\log^{1.5}d)})+ O(\rho\log d)\cdot m,
    \end{align*}
which implies $\Phi_{4,\tilde{j}}^{(t)}\leq O(\log^{1.5}d)$ since {$\varrho\ll\frac{1}{m\log d}$}. 
    \end{proof}

\subsection{Group and Value Correlations Are Not Large}
                   \begin{lemma}\label{lem-1.1.1-group}
                   Suppose \Cref{indutction-1.1.1} holds for all iterations $<t$, then for any $j\in\tau(\X)$ and  $s=\tau(g), g\in\cG$, we have 
      \begin{align*}
          \big|\langle\Wb^{(t)}_{4,j,r,2}, e_{\tau(g)}\rangle\big|\leq \tilde{O}(\sigma_0)+ O\big(\frac{1}{|\cG|}\big)     \max\Big\{\langle \Wb^{(t)}_{4,j,r,1}, e_{j}\rangle, \min_{r^{\prime}\in\cA_{4,j^*}^{(t)}}\langle \Wb^{(t)}_{4,j^*,r^{\prime},1}, e_{j^*}\rangle\Big\}
      \end{align*}
  \end{lemma}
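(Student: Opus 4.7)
The plan is to use the gradient expression in Lemma~\ref{lem-grad-1}(b) for $\nabla_{\Wb_{4,j,r,2}}\Loss^{1}$ and to compare its dynamics with those of the diagonal weight $\langle\Wb^{(t)}_{4,j,r,1}, e_{j}\rangle$ given by Lemma~\ref{lem-grad-1}(a). The structural observation is that the positive-gradient event $\{\tau(x_1)=j,\, g_1=g\}$ refines the event $\{\tau(x_1)=j\}$ that drives the diagonal growth by further restricting $g_1$, an event of probability $1/|\cG|$ independent of $\tau(x_1)$ under \Cref{assump:lego-data-distribution}. This suggests a $1/|\cG|$ factor between the two updates, which supplies the leading term in the claimed bound. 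The additive $\tilde{O}(\sigma_0)$ term absorbs random initialization and the small residual negative gradient.

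\paragraph{Carrying out the comparison.} I would first decompose the update of $\langle\Wb^{(t)}_{4,j,r,2}, e_{\tau(g)}\rangle$ into a positive part (from $\tau(x_1)=j,\, g_1=g$) and a negative part (from $\tau(x_1)\neq j,\, g_1=g$). For the positive part, using Induction~\ref{indutction-1.1.1}(b)--(d) to control how $\Lambda^{(t)}_{4,j,r}$ depends on the specific choice $g_1=g$ (the $g_1$-dependent summand contributes at most $\tilde{O}(\sigma_0)+O(1/|\cG|)\langle\Wb^{(t)}_{4,j,r,1},e_j\rangle$ inside $\ReLU'$), one obtains the step-wise inequality
\[
\bigl\langle -\nabla_{\Wb^{(t)}_{4,j,r,2}}\Loss^{1},\, e_{\tau(g)}\bigr\rangle_{+}
\;\le\; \tfrac{1+o(1)}{|\cG|}\cdot \bigl\langle -\nabla_{\Wb^{(t)}_{4,j,r,1}}\Loss^{1},\, e_{j}\bigr\rangle.
\]
For the negative part, Claim~\ref{clm-logit} yields $\logit^{(t)}_{4,j}=O(1/d)$ whenever $\tau(x_1)\neq j$; combined with $\Pr[g_1=g]=1/|\cG|$ and the boundedness of $\ReLU'$ implied by Lemma~\ref{lem-phi}, the negative update is $O(\eta/(d|\cG|))$ per step and aggregates to $\tilde{O}(\sigma_0)$ over $t\le \poly(d)/\eta$ iterations.

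\paragraph{The $\min$ term and the main obstacle.} Telescoping the step-wise comparison for an active neuron, where $\langle\Wb^{(t)}_{4,j,r,1},e_j\rangle$ is monotonically growing by Induction~\ref{indutction-1.1.1}(a), yields the bound with the $\langle\Wb^{(t)}_{4,j,r,1}, e_{j}\rangle$ summand inside the $\max$. For neurons whose own diagonal has not yet activated, the relevant pace is instead set globally: during the tensor-power phase analysed in Lemma~\ref{lem-1.1.1-growth}, $\ReLU'(\Lambda^{(t)}_{4,j,r})$ remains controlled by the slowest activating class $j^{*}$, so its growth is governed by $\min_{r'\in\cA^{(t)}_{4,j^{*}}}\langle\Wb^{(t)}_{4,j^{*},r',1}, e_{j^{*}}\rangle$, which is exactly the second summand inside the $\max$. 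The main technical obstacle I foresee is closing the argument self-consistently: $\Lambda^{(t)}_{4,j,r}$ depends on the very quantity $\langle\Wb^{(t)}_{4,j,r,2}, e_{\tau(g_1)}\rangle$ that we are bounding, so the induction hypothesis at time $t$ must already rule out feedback amplification before we can re-establish it at $t+1$. This is where the $O(1/|\cG|)$ slack in parts~(b)--(d) of the induction versus part~(a) is essential: it ensures that summing over $g\in\cG$ inside $\Lambda$ contributes only $O(1)$ relative to the diagonal direction, keeping the ratio argument tight across all iterations.
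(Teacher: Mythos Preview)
Your treatment of the positive gradient is right and matches the paper: conditioning on $g_1=g$ costs a factor $1/|\cG|$, so the positive part of the update is dominated by $O(1/|\cG|)$ times the diagonal update $\langle-\nabla_{\Wb^{(t)}_{4,j,r,1}}\Loss^{1},e_j\rangle$, and telescoping gives the first summand inside the $\max$.

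The gap is in your handling of the negative part. The claim that the negative update ``is $O(\eta/(d|\cG|))$ per step and aggregates to $\tilde O(\sigma_0)$ over $t\le\poly(d)/\eta$ iterations'' does not hold: summing $\eta\cdot O(1/(d|\cG|))$ over $\poly(d)/\eta$ steps yields $\poly(d)/(d|\cG|)$, and since $|\cG|=\polylog(d)$ this is polynomially large in $d$, nowhere near $\tilde O(\sigma_0)=\tilde O(d^{-1/2})$. An absolute per-step bound on the negative gradient is simply too crude here. The paper instead bounds the negative gradient \emph{relatively}, by comparison to a diagonal gradient, and this is exactly where the second summand in the $\max$ comes from. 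Concretely, the paper splits time at $T_1$. For $t\le T_1$ one uses Claim~\ref{clm-lambda} to get $\ReLU'(\Lambda^{(t)}_{4,j,r})|_{\tau(x_1)\neq j}\le O(1)\,\ReLU'(\Lambda^{(t)}_{4,j,r})|_{\tau(x_1)=j}$ together with $\logit^{(t)}_{4,j}|_{\tau(x_1)\neq j}\le O(1/d)$ and $1-\logit^{(t)}_{4,j}|_{\tau(x_1)=j}\ge\Omega(1)$, which shows the negative gradient is again $\le O(1/|\cG|)\,\langle-\nabla_{\Wb^{(t)}_{4,j,r,1}}\Loss^{1},e_j\rangle$. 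For $t\ge T_1$, Lemma~\ref{lem-time} guarantees $\cA_{4,j^*}^{(t)}\neq\emptyset$, and for any $r'\in\cA_{4,j^*}^{(t)}$ one compares across classes: $\ReLU'(\Lambda^{(t)}_{4,j,r})|_{\tau(x_1)\neq j}\le\ReLU'(\Lambda^{(t)}_{4,j^*,r'})|_{\tau(x_1)=j^*}$ and $\logit^{(t)}_{4,j}|_{\tau(x_1)\neq j}\le O(1/d)\,(1-\logit^{(t)}_{4,j^*}|_{\tau(x_1)=j^*})$, yielding the negative gradient $\le O(1/|\cG|)\,\langle-\nabla_{\Wb^{(t)}_{4,j^*,r',1}}\Loss^{1},e_{j^*}\rangle$. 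Telescoping and taking the best $r'$ gives the $\min_{r'\in\cA_{4,j^*}^{(t)}}\langle\Wb^{(t)}_{4,j^*,r',1},e_{j^*}\rangle$ term.

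This also means your interpretation of the $\min$ term is off: it does not arise from ``the positive gradient for neurons whose own diagonal has not yet activated,'' but from bounding the \emph{negative} gradient after $T_1$ against the slowest-growing class $j^*$. The positive-gradient comparison always telescopes against the same $(j,r)$ diagonal regardless of activation status, since the diagonal is non-decreasing.
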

\begin{proof}
 By  \Cref{lem-grad-1}, we have 
             \begin{align*}
         &\langle-\nabla_{ \Wb^{(t)}_{4,j,r, 2}}\Loss^{1} , e_{s}\rangle  \\
         &=\frac{1}{2} \mathbb{E}\Big[ (1-\logit^{(t)}_{4,j})\ReLU^{\prime}\big(\Lambda^{(t)}_{4, j,r}\big)\1_{\tau(x_1)=j,g_1=g} -\logit^{(t)}_{4,j}\ReLU^{\prime}\big(\Lambda^{(t)}_{4, j,r}\big)\1_{\tau(x_1)\not=j, g_1=g}\Big].
    \end{align*}
    Clearly, the positive gradient can be upper-bounded by 
    $O\big(\frac{1}{|\cG|}\langle-\nabla_{ \Wb^{(t)}_{4,j,r, 1}}\Loss^{1} , e_{j}\rangle\big)$. Moreover, for the negative gradient, by \Cref{clm-lambda}, we have a naive bound 
\begin{align*}
   \ReLU^{\prime}\big(\Lambda^{(t)}_{4, j,r}\big)\con_{\tau(x_1)\not=j,g_1=g}\leq  O\big(1\big)\ReLU^{\prime}\big(\Lambda^{(t)}_{4, j,r}\big)\big|_{\tau(x_1)=j, g_1=g}.
\end{align*}
When $t\leq T_{1}$, by \Cref{clm-logit}, we have $1-\logit^{(t)}_{4,j}\mid_{j=\tau(x_1)}\geq \Omega(1)$ and  $\logit^{(t)}_{4,j}\mid_{j\not=\tau(x_1)}\leq O(\frac{1}{d})$, which implies 
\begin{align*}
\mathbb{E}\Big[\logit^{(t)}_{4,j}\ReLU^{\prime}\big(\Lambda^{(t)}_{4, j,r}\big)\1_{\tau(x_1)\not=j, g_1=g}\Big]\leq O(\frac{1}{|\cG|})\langle-\nabla_{ \Wb^{(t)}_{4,j,r, 1}}\Loss^{1} , e_{j}\rangle.
    \end{align*}
Therefore, for $t\leq T_{1}$, we have
\begin{align*}
          \big|\langle\Wb^{(t)}_{4,j,r,2}, e_{\tau(g)}\rangle\big|\leq \tilde{O}(\sigma_0)+ O\Big(\frac{1}{|\cG|}\Big)    \langle \Wb^{(t)}_{4,j,r,1}, e_{j}\rangle.
      \end{align*}
For $t\geq T_{1}$, notice that by \Cref{lem-time}, $\cA_{4, j^{\prime}}^{(t)}\not=\emptyset$ for $j^{\prime}\in\tau(\X)$, thus for $r^{\prime}\in \cA_{4, j^{*}}^{(t)}$
    \begin{align*}
   \ReLU^{\prime}\big(\Lambda^{(t)}_{4, j,r}\big)\con_{\tau(x_1)\not=j,g_1=g}\leq   \ReLU^{\prime}\big(\Lambda^{(t)}_{4, j^{*},r^{\prime}}\big)\big|_{\tau(x_1)=j^{*}, g_1=g}.
\end{align*}
Furthermore,  $\logit^{(t)}_{4,j}\mid_{j\not=\tau(x_1)}\leq O(\frac{1}{d})(1-\logit^{(t)}_{4, j^{*}}\mid_{j^{*}=\tau(x_1)} )$, which implies 
\begin{align*}
\mathbb{E}\Big[\logit^{(t)}_{4,j}\ReLU^{\prime}\big(\Lambda^{(t)}_{4, j,r}\big)\1_{\tau(x_1)\not=j, g_1=g}\Big]\leq O(\frac{1}{|\cG|})\langle-\nabla_{ \Wb_{4,j^{*},r^{\prime}, 1}}\Loss^{1} , e_{j^{*}}\rangle.
    \end{align*}
    Due to the arbitrary of $r^{\prime}$, we can conclude that 
    \begin{align*}
          \big|\langle\Wb^{(t)}_{4,j,r,2}, e_{\tau(g)}\rangle\big|\leq \tilde{O}(\sigma_0)+ O\Big(\frac{1}{|\cG|}\Big)    \min_{r^{\prime}\in\cA_{4,j^*}^{(t)}}\langle \Wb^{(t)}_{4,j^*,r^{\prime},1}, e_{j^*}\rangle.
      \end{align*}
\end{proof}

 \begin{lemma}\label{lem-1.1.1-value}
                   Suppose \Cref{indutction-1.1.1} holds for all iterations $<t$, then for any $j\in\tau(\X)$ and  $s=\tau(y), y\in\cY$, we have 
      \begin{align*}
          \big|\langle\Wb^{(t)}_{4,j,r,5}, e_{\tau(y)}\rangle\big|\leq \tilde{O}(\sigma_0)+ O\big(\frac{1}{|\cY|}\big)     \max\Big\{\langle \Wb^{(t)}_{4,j,r,1}, e_{j}\rangle, \min_{r^{\prime}\in\cA_{4,j^*}^{(t)}}\langle \Wb^{(t)}_{4,j^*,r^{\prime},1}, e_{j^*}\rangle\Big\}.
      \end{align*}
  \end{lemma}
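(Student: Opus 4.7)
The plan is to mirror the proof of \Cref{lem-1.1.1-group} almost verbatim, replacing the group alphabet \(\cG\) by the value alphabet \(\cY\) and invoking part (d) of \Cref{lem-grad-1} in place of part (b). Concretely, from \Cref{lem-grad-1}(d) the per-step gradient reads
\[
\langle-\nabla_{\Wb^{(t)}_{4,j,r,5}}\Loss^{1},e_{\tau(y)}\rangle
=\tfrac{1}{2}\,\mathbb{E}\Big[(1-\logit^{(t)}_{4,j})\ReLU'(\Lambda^{(t)}_{4,j,r})\1_{\tau(x_1)=j,\,y_0=y}
-\logit^{(t)}_{4,j}\ReLU'(\Lambda^{(t)}_{4,j,r})\1_{\tau(x_1)\neq j,\,y_0=y}\Big],
\]
which splits into a ``correct-target'' positive piece and a ``wrong-target'' negative piece, each carrying an extra indicator \(\1_{y_0=y}\). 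Because \(y_0\) is drawn uniformly from \(\cY\) under \Cref{assump:lego-data-distribution} and is independent of the diagonal event \(\tau(x_1)=j\), marginalizing this indicator will produce the factor \(O(1/|\cY|)\) that appears in the target bound.

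I would bound the positive piece directly by \(O(1/|\cY|)\cdot\langle-\nabla_{\Wb^{(t)}_{4,j,r,1}}\Loss^{1},e_j\rangle\) by comparison with the formula in \Cref{lem-grad-1}(a)(1). For the negative piece I would split into the same two regimes as in \Cref{lem-1.1.1-group}. For \(t\le T_1\), \Cref{clm-logit} gives \(1-\logit^{(t)}_{4,\tau(x_1)}\ge\Omega(1)\) and \(\logit^{(t)}_{4,j}\le O(1/d)\) whenever \(j\neq\tau(x_1)\), so combining the \(1/d\) factor with the \(1/|\cY|\) factor coming from \(\1_{y_0=y}\) bounds the contribution by \(O(1/|\cY|)\,\langle-\nabla_{\Wb^{(t)}_{4,j,r,1}}\Loss^{1},e_j\rangle\). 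For \(t\ge T_1\), \Cref{lem-time} guarantees \(\cA_{4,j^*}^{(t)}\neq\emptyset\) for every \(j^*\in\tau(\X)\); using the naive comparison \(\ReLU'(\Lambda^{(t)}_{4,j,r})\le O(1)\cdot\ReLU'(\Lambda^{(t)}_{4,j^*,r'})\) for any \(r'\in\cA_{4,j^*}^{(t)}\), together with the logit bound \(\logit^{(t)}_{4,j}\le O(1/d)(1-\logit^{(t)}_{4,j^*})\) at \(j^*=\tau(x_1)\) and the \(1/|\cY|\) factor from \(\1_{y_0=y}\), yields the analogous bound with \(\min_{r'\in\cA_{4,j^*}^{(t)}}\langle\Wb^{(t)}_{4,j^*,r',1},e_{j^*}\rangle\) in place of \(\langle\Wb^{(t)}_{4,j,r,1},e_j\rangle\).

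Summing the per-step contributions with \(\eta=1/\poly(d)\), starting from the \(\tilde{O}(\sigma_0)\) initialization \(|\langle\Wb^{(0)}_{4,j,r,5},e_{\tau(y)}\rangle|\le \tilde{O}(\sigma_0)\) given by \Cref{assump:init}, and combining with the monotone growth of \(\langle\Wb^{(t)}_{4,j,r,1},e_j\rangle\) from \Cref{indutction-1.1.1}(a), then yields the claimed bound. The main (but mild) obstacle is the \(\ReLU'\) comparison in the \(t\ge T_1\) regime: I need \(\Lambda^{(t)}_{4,j,r}\) conditioned on \(\{\tau(x_1)\neq j,\,y_0=y\}\) to be uniformly comparable to \(\Lambda^{(t)}_{4,j^*,r'}\) conditioned on \(\{\tau(x_1)=j^*,\,y_0=y\}\) for some witness \(r'\in\cA_{4,j^*}^{(t)}\). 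This follows from \Cref{clm-lambda} combined with \Cref{indutction-1.1.1}(b)-(d), which keeps the off-diagonal, action, and value-slot contributions to \(\Lambda\) at most \(\tilde{O}(\sigma_0)\) plus an \(O(1/|\cY|)\) or \(O(1/|\cG|)\) fraction of the diagonal mass, so the witness neuron in \(\cA_{4,j^*}^{(t)}\) dominates up to constants. Aside from this comparison, the proof is essentially a clean symbolic substitution from \Cref{lem-1.1.1-group}.
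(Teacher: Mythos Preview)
Your proposal is correct and matches the paper's approach exactly: the paper's own proof reads, in its entirety, ``The proof is similar to \Cref{lem-1.1.1-group}, and we omit the details here.'' Your plan to substitute \(\cY\) for \(\cG\), invoke \Cref{lem-grad-1}(d) instead of (b), and otherwise reproduce the two-regime argument of \Cref{lem-1.1.1-group} is precisely what is intended.
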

\begin{proof}
    The proof is similar to \Cref{lem-1.1.1-group}, and we omit the details here.
\end{proof}
\subsection{Off-diagonal Correlations Are Small}
        \begin{lemma}[off-diagonal bound] \label{lem-off-diagonal}
        Given $j\in\tau(\X)$, suppose \Cref{indutction-1.1.1} holds at all iterations $<t$, for  $s\in\tau(\X)\not=j$
  \begin{align*}
         \big|\langle\Wb^{(t)}_{4,j,r,1}, e_{s}\rangle\big| \leq \tilde{O}(\sigma_0).
  \end{align*}
    \end{lemma}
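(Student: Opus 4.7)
The plan is to start from the gradient formula in Lemma~\ref{lem-grad-1}(a)(2),
\[
\langle -\nabla_{\Wb_{4,j,r,1}}\Loss^{1}, e_{s}\rangle
\;=\;-\tfrac{1}{2}\,\mathbb{E}\!\left[\logit^{(t)}_{4,j}\,\ReLU'\!\big(\Lambda^{(t)}_{4,j,r}\big)\,\1_{\tau(x_1)=s}\right],
\]
which is non-positive, so $\langle\Wb^{(t)}_{4,j,r,1},e_s\rangle$ can only drift downward from its Gaussian initialization of magnitude $\tilde{O}(\sigma_0)$. Since Stage~2 of Algorithm~\ref{alg:cot-transitive-training} freezes $\Wb$, it suffices to bound the cumulative decrement over Stage~1, i.e., for all $t\le T_1$.

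The key step will be to compare this off-diagonal gradient against the diagonal update for class $s$. On the event $\tau(x_1)=s$, the hypothesis $j\neq s$ combined with Claim~\ref{clm-logit}(2) gives $\logit^{(t)}_{4,j}\le O(1/d)\,(1-\logit^{(t)}_{4,s})$, so pairing with the diagonal gradient from Lemma~\ref{lem-grad-1}(a)(1) yields
\[
\big|\langle-\nabla_{\Wb_{4,j,r,1}}\Loss^{1},e_s\rangle\big|
\;\le\;O\!\left(\tfrac{1}{d}\right)\cdot
\mathbb{E}\!\left[(1-\logit^{(t)}_{4,s})\,\ReLU'\!\big(\Lambda^{(t)}_{4,j,r}\big)\,\1_{\tau(x_1)=s}\right].
\]
By Claim~\ref{clm-lambda}, $\Lambda^{(t)}_{4,s,r}$ carries the extra non-negative diagonal contribution $\tfrac{1}{2}\langle\Wb^{(t)}_{4,s,r,1},e_s\rangle$ that is absent from $\Lambda^{(t)}_{4,j,r}$, while the remaining group/value/off-diagonal contributions of the two pre-activations differ by at most $\tilde{O}(1/|\cG|)+\tilde{O}(1/|\cY|)$ under Induction~\ref{indutction-1.1.1}(b)--(d). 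Because $\ReLU'$ is non-decreasing on $(-\varrho,\infty)$ and, in the smooth regime, scales as $(\cdot/\varrho)^{q-1}$ with a fixed constant $q$, this gives the pointwise comparison $\ReLU'(\Lambda^{(t)}_{4,j,r})\le O(1)\cdot \ReLU'(\Lambda^{(t)}_{4,s,r})$.

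Combining the two displays and telescoping yields
\[
\left|\langle\Wb^{(t)}_{4,j,r,1},e_s\rangle-\langle\Wb^{(0)}_{4,j,r,1},e_s\rangle\right|
\;\le\;O\!\left(\tfrac{1}{d}\right)\cdot
\left(\langle\Wb^{(t)}_{4,s,r,1},e_s\rangle-\langle\Wb^{(0)}_{4,s,r,1},e_s\rangle\right)
\;\le\;\tilde{O}\!\left(\tfrac{1}{d}\right),
\]
where the final bound uses Lemma~\ref{lem-phi} to control the diagonal by $\tilde{O}(1)$. Together with the initialization magnitude $\tilde{O}(\sigma_0)$ and the identity $\sigma_0 = d^{-1/2}\gg 1/d$, this gives $|\langle\Wb^{(t)}_{4,j,r,1},e_s\rangle|\le\tilde{O}(\sigma_0)$ for all $t\le T_1+T_2$, as claimed.

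The hard part will be keeping the ratio $\ReLU'(\Lambda^{(t)}_{4,j,r})/\ReLU'(\Lambda^{(t)}_{4,s,r})$ bounded by a constant uniformly in $t$. In the activated regime $\Lambda^{(t)}_{4,s,r}\ge\varrho$ this is immediate since $\ReLU'(\Lambda^{(t)}_{4,s,r})=1$. In the smooth regime the ratio scales as $(\Lambda^{(t)}_{4,j,r}/\Lambda^{(t)}_{4,s,r})^{q-1}$, and one must rule out the scenario where group or value correlations at the off-diagonal class $j$ accumulate before the diagonal at class $s$ becomes appreciable. This is precisely what the coupled inductive bounds (b)--(d) on the group, value, and remaining off-diagonal correlations are designed to prevent, so the present lemma is ultimately closed jointly with Lemmas~\ref{lem-1.1.1-group} and \ref{lem-1.1.1-value} as one inductive step for Induction~\ref{indutction-1.1.1}.
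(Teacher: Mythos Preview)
Your proposal is correct and follows essentially the same approach as the paper's proof: both use Lemma~\ref{lem-grad-1}(a)(2) for the gradient expression, invoke Claim~\ref{clm-logit} to bound $\logit^{(t)}_{4,j}\le O(1/d)\,(1-\logit^{(t)}_{4,s})$ on the event $\tau(x_1)=s$, use Claim~\ref{clm-lambda} to obtain the pointwise comparison $\ReLU'(\Lambda^{(t)}_{4,j,r})\le O(1)\cdot\ReLU'(\Lambda^{(t)}_{4,s,r})$, and then telescope against the diagonal update bounded by $\tilde{O}(1)$ to conclude $\tilde{O}(1/d)+\tilde{O}(\sigma_0)=\tilde{O}(\sigma_0)$. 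Your additional discussion of the ``hard part'' (the uniform-in-$t$ control of the $\ReLU'$ ratio and its joint closure with Lemmas~\ref{lem-1.1.1-group} and~\ref{lem-1.1.1-value}) makes explicit what the paper leaves implicit in its one-line appeal to Claim~\ref{clm-lambda}, but the underlying argument is the same.
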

\begin{proof} 
By \Cref{lem-grad-1}
            \begin{align*}
         &\langle-\nabla_{ \Wb^{(t)}_{4,j,r, 1}}\Loss^{1} , e_{s}\rangle =\frac{1}{2} \mathbb{E}\Big[ 
        -\logit^{(t)}_{4,j}\ReLU^{\prime}\big(\Lambda^{(t)}_{4, j,r}\big)\1_{\tau(x_1)=s}\Big].
    \end{align*}
Notice that by \Cref{clm-lambda},
\begin{align*}
   \ReLU^{\prime}\big(\Lambda^{(t)}_{4, j,r}\big)\con_{\tau(x_1)=s}\leq  O\big(1\big)\ReLU^{\prime}\big(\Lambda^{(t)}_{4, s,r}\big)\big|_{\tau(x_1)=s},
\end{align*}
combined with \Cref{clm-logit}, $\logit_{4,j}\leq O(\frac{1}{d})(1-\logit_{4,s}^{(t)})$ when $s=\tau(x_1)$, thus 
\begin{align*}
\mathbb{E}\Big[\logit^{(t)}_{4,j}\ReLU^{\prime}\big(\Lambda^{(t)}_{4, j,r}\big)\1_{\tau(x_1)=s}\Big]&\leq \mathbb{E}\Big[O(\frac{1}{d})(1-\logit^{(t)}_{4,s})\ReLU^{\prime}\big(\Lambda^{(t)}_{4,s,r}\big)\1_{\tau(x_1)=s}\Big]\\
&\leq O(\frac{1}{d})\langle-\nabla_{ \Wb_{4,s,r, 1}}\Loss, e_{s}\rangle.
    \end{align*}
    From \Cref{indutction-1.1.1}, we have  $$\big|\langle\Wb^{(t)}_{4,j,r,1}, e_{s}\rangle\big| \leq O(\frac{1}{d})\big|\langle\Wb^{(t)}_{4,s,r,1}, e_{s}\rangle\big|+\tilde{O}(\sigma_0)\leq \tilde{O}(\frac{1}{d})+\tilde{O}(\sigma_0)=\tilde{O}(\sigma_0).$$
\end{proof}
    \begin{lemma} Given $j\in\tau(\X)$, suppose \Cref{indutction-1.1.1} holds at all iterations $<t$, we have 
  \begin{align*}
          \big|\langle{ \Wb^{(t)}_{4,j,r, p}}\Loss^{1} , e_{s}\rangle\big| \leq \tilde{O}(\sigma_0),\quad \text{ for $p\in\{3,4\}$ and all $s\in\tau(\X)$}
  \end{align*}
    \end{lemma}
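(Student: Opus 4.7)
The plan is to show that for $p\in\{3,4\}$, slots that read the source variable $x_0$ (either as the third token of the predicate clause or the fourth token of the answer clause), no weight $\Wb_{4,j,r,p}$ ever receives a consistent signal aligned with any embedding direction $e_s$ for $s\in\tau(\X)$. The structural reason is that the label being learned at position 4 is $x_1$, whereas slots 3 and 4 only carry information about $x_0$, and by \Cref{assump:lego-data-distribution} the variables $x_0,x_1$ are drawn uniformly \emph{without replacement} and are therefore (nearly) independent. Consequently, the gradient in these slots is dominated by rare-event indicators that, when combined with the logit bounds from \Cref{clm-logit}, yield a per-step drift so small that even over $T=\poly(d)/\eta$ iterations the correlations never move beyond $\tilde{O}(\sigma_0)$ of their Gaussian initialization.

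Concretely, I would invoke \Cref{lem-grad-1}(c) (and its slot-4 analogue, which is symmetric because token~3 of the predicate clause equals token~4 of the answer clause, both storing $e_{\tau(x_0)}$). For the diagonal case $s=j$, the only contribution is $-\tfrac12\E[\logit^{(t)}_{4,j}\,\ReLU'(\Lambda^{(t)}_{4,j,r})\mathbf{1}_{\tau(x_0)=j}]$. On the event $\{\tau(x_0)=j\}$, without-replacement sampling forces $\tau(x_1)\ne j$, so \Cref{clm-logit} gives $\logit^{(t)}_{4,j}=O(1/d)(1-\logit^{(t)}_{4,\tau(x_1)})$; since $\Pr[\tau(x_0)=j]=O(1/|\cX|)=O(1/d)$ and $\ReLU'\le\tilde O(1)$, the per-step update is $\tilde O(1/d^2)$. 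For the off-diagonal case $s\neq j$, the positive piece requires $\{\tau(x_0)=s,\tau(x_1)=j\}$ which has probability $O(1/d^2)$ and contributes $\tilde O(1/d^2)$ per step; the negative piece requires $\{\tau(x_0)=s,\,j\notin\tau(\hat{\X})\}$, and on this event \Cref{clm-lambda}(3) combined with \Cref{indutction-1.1.1}(d) forces $\Lambda^{(t)}_{4,j,r}=\tfrac52\mu+\tilde O(\sigma_0)$, whence \Cref{clm-logit} gives $\logit^{(t)}_{4,j}=O(1/d)$ uniformly, and this piece is likewise $\tilde O(1/d^2)$.

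Summing over $T=\poly(d)/\eta$ iterations with learning rate $\eta$ gives cumulative drift $T\cdot\eta\cdot\tilde O(1/d^2)=\tilde O(\poly(d)/d^2)$, which together with the Gaussian initialization $\Wb^{(0)}_{4,j,r,p}\sim\mathcal N(0,\sigma_0^2\mathbf I)$ with $\sigma_0=d^{-1/2}$ implies $|\langle\Wb^{(t)}_{4,j,r,p},e_s\rangle|\le\tilde O(\sigma_0)$ throughout training, once the $\poly(d)$ in $T$ is chosen compatibly with the target resolution (this is the standard accounting used earlier, e.g.\ in \Cref{lem-off-diagonal}). The main obstacle is the negative piece of the off-diagonal case: one must rule out the scenario in which $\logit^{(t)}_{4,j}$ is non-negligible for some $j\notin\tau(\hat{\X})$, since otherwise a $O(1/d)$ probability event paired with a constant logit would give $\tilde O(1/d)$ per step and overwhelm $\sigma_0$ over $T$ steps. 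This is precisely where the inductive hypothesis is essential: parts (a)–(d) of \Cref{indutction-1.1.1} ensure that only neurons aligned with $j\in\tau(\hat{\X})$ can produce sizable pre-activations, so for every out-of-context $j$ the logit is uniformly $O(1/d)$, closing the induction.
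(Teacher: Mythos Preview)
Your per-step gradient bounds are correct, but the summation step does not close. The training horizon required by \Cref{lem-time} is $T_1=\tilde\Theta\!\big(d/(\eta\,\sigma_0^{q-2})\big)=\tilde\Theta\!\big(d^{q/2}/\eta\big)$, and since $q$ is a large even constant this is a large polynomial in $d$. Summing your $\tilde O(1/d^2)$ per-step drift over $T_1$ iterations gives $\tilde O(d^{q/2-2})$, which is nowhere near $\tilde O(\sigma_0)=\tilde O(d^{-1/2})$. Your caveat ``once the $\poly(d)$ in $T$ is chosen compatibly'' cannot rescue this, because the time needed for the diagonal feature to emerge is already fixed by the dynamics and is not a free parameter.

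The paper avoids this by never accumulating absolute per-step bounds. Instead, for each noise direction it exhibits a \emph{pointwise domination} of the gradient by $O(1/d)$ times the gradient of some diagonal feature $\langle\Wb_{4,j',r',1},e_{j'}\rangle$. For instance, for the positive piece at $s\neq j$ one notes that the event $\{\tau(x_0)=s,\tau(x_1)=j\}$ is the event $\{\tau(x_1)=j\}$ intersected with an independent $O(1/d)$-probability event, and the remaining factor $(1-\logit^{(t)}_{4,j})\ReLU'(\Lambda^{(t)}_{4,j,r})$ is exactly what drives the diagonal update $\langle-\nabla_{\Wb_{4,j,r,1}}\Loss^1,e_j\rangle$. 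Summing this inequality over \emph{all} iterations gives cumulative noise $\le O(1/d)\cdot|\langle\Wb^{(t)}_{4,j,r,1},e_j\rangle|\le O(1/d)\cdot\tilde O(1)=\tilde O(\sigma_0)$, uniformly in $t$. The diagonal case $s=j$ is handled identically after writing $\1_{\tau(x_0)=j}=\sum_{s'\neq j}\1_{\tau(x_0)=j,\tau(x_1)=s'}$. This comparison-to-signal technique is precisely what \Cref{lem-off-diagonal} does (contrary to your citation of it as ``standard accounting'' for absolute bounds), and it is the missing ingredient in your argument.
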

    \begin{proof}
        When $s=j$, we have 
        \begin{align*}
         \langle-\nabla_{ \Wb_{4,j,r,p}}\Loss^{1} , e_{j}\rangle  &=\frac{1}{2} \mathbb{E}\Big[ -\logit^{(t)}_{4,j}\ReLU^{\prime}\big(\Lambda^{(t)}_{4,j,r}\big)\1_{\tau(x_0)=j} \Big]\\
         &=\frac{1}{2} \mathbb{E}\Big[ -\logit^{(t)}_{4,j}\ReLU^{\prime}\big(\Lambda^{(t)}_{4,j,r}\big)\sum_{s\not=j}\1_{\tau(x_0)=j, \tau(x_1)=s} \Big].
    \end{align*}
    Therefore, we can bound the above gradient in the similar way as the off-diagonal case, and obtain 
    $$\big|\langle\Wb^{(t)}_{4,j,r,p}, e_{j}\rangle\big| \leq O(\frac{1}{d})\max_{s\in\tau(\X)} \big|\langle\Wb^{(t)}_{4,s,r,1}, e_{s}\rangle\big|+\tilde{O}(\sigma_0)\leq \tilde{O}(\sigma_0).$$
    When $s\not=j$,
         \begin{align*}
         &\langle-\nabla_{ \Wb_{4,j,r,p}}\Loss^{1} , e_{s}\rangle   \\
         &=\frac{1}{2} \mathbb{E}\Big[ (1-\logit^{(t)}_{4,j})\ReLU^{\prime}\big(\Lambda^{(t)}_{4,j,r}\big)\1_{\tau(x_0)=s, \tau(x_1)=j} -\logit^{(t)}_{4,j}\ReLU^{\prime}\big(\Lambda^{(t)}_{4, j,r}\big)\1_{\tau(x_0)=s, j\not\in\tau(\hat{X})}\Big].
    \end{align*}
    Noticing that $\{\tau(x_0)=s, \tau(x_1)=j\}$ happens with probability $\frac{1}{|\cX|(|\X|-1)}$, thus the positive gradient can be upper bounded by $O(\frac{1}{d})\cdot |\langle-\nabla_{ \Wb_{4,j,r,1}}\Loss^{1} , e_{j}\rangle|$. Furthermore,  the negative part can be upper bounded in the similar way as previous off-diagonal negative gradient. Putting it together, we complete the proof.
    \end{proof}
    \subsection{Non-target Correlations Are Negligible}
    \begin{lemma}\label{lem-nontarget}
Suppose \Cref{indutction-1.1.1} holds at all iterations $<t$, for $j^{\prime}\not\in\tau(\X)$, for $p\in [5]$ and $s\in [d]$
  \begin{align*}
         \big|\langle\Wb^{(t)}_{4,j^{\prime},r,p}, e_{s}\rangle\big| \leq \tilde{O}(\sigma_0).
  \end{align*}
    \end{lemma}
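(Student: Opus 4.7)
The plan is to exploit two facts that hold uniformly at any step $t$: since $j'\notin\tau(\X)$, the softmax logit $\logit^{(t)}_{4,j'}$ is at most $O(1/d)$ (because $j'$ is never the target variable for any sampled sequence), and the pre-activation $\Lambda^{(t)}_{4,j',r}$ sits strictly inside the smooth branch of $\srelu$, so its derivative is polynomially small in $d$. Multiplying the two bounds, each per-step gradient is so small that summing over $T\leq\poly(d)/\eta$ iterations leaves a cumulative drift well below $\sigma_0$, preserving the inductive bound at time $t$.

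Concretely, I would first invoke \Cref{lem-grad-2} to write every $\langle-\nabla_{\Wb^{(t)}_{4,j',r,p}}\Loss^{1},e_s\rangle$ for $p\in[5]$, $s\in[d]$ in the common form $\tfrac{1}{2}\E\bigl[-\logit^{(t)}_{4,j'}\,\srelu'(\Lambda^{(t)}_{4,j',r})\,\1_{\mathcal{E}_{p,s}}\bigr]$ for some event $\mathcal{E}_{p,s}$ of probability at most $1$. Since $j'\notin\tau(\X)$ implies $j'\neq\tau(x_1)$ almost surely, \Cref{clm-logit} gives $\logit^{(t)}_{4,j'}\leq O(1/d)$ at every step.

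Next, by the inductive hypothesis \Cref{indutction-1.1.1}(d) applied at time $t-1$, every inner product $\langle\Wb^{(t-1)}_{4,j',r,p'},\Zb_{\kk,p'}\rangle$ entering the pre-activation sum is $\tilde{O}(\sigma_0)$ in magnitude. Together with the fixed bias $b_{4,j',r}=\sigma_0\log d$ and the uniform-attention structure for $\cT^{1}$, \Cref{clm-lambda} case~3 places $\Lambda^{(t-1)}_{4,j',r}\in[0,\,O(\sigma_0\log d)]$. Since $\sigma_0\log d=o(\varrho)$ under $\sigma_0=d^{-1/2}$ and $\varrho=\Theta(1/\polylog d)$, this value lies inside the smooth branch of $\srelu$, so $\srelu'(\Lambda^{(t-1)}_{4,j',r})\leq(\Lambda^{(t-1)}_{4,j',r}/\varrho)^{q-1}=d^{-\Omega(q)}$ for the fixed large even integer $q=O(1)$. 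Combining the two estimates, the per-step gradient magnitude is at most $O(d^{-1-\Omega(q)})$, and after $T\leq\poly(d)/\eta$ updates with learning rate $\eta$ the total change in $\langle\Wb_{4,j',r,p},e_s\rangle$ is bounded by $\eta T\cdot d^{-1-\Omega(q)}=d^{-\Omega(q)+O(1)}=o(\sigma_0)$ for $q$ sufficiently large. Together with $|\langle\Wb^{(0)}_{4,j',r,p},e_s\rangle|=\tilde{O}(\sigma_0)$ from the Gaussian initialization in \Cref{assump:init}, this closes the induction.

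The main obstacle is a mild circularity concern: the gradient bound at step $t$ uses $\Lambda^{(t-1)}_{4,j',r}=O(\sigma_0\log d)$, which itself relies on \Cref{indutction-1.1.1}(d) at step $t-1$. I would handle this by the standard bootstrap—verify the base case at $t=0$ via Gaussian concentration, then inductively propagate the bound step-by-step, exploiting the large quantitative gap between the per-step drift ($d^{-\Omega(q)}/\poly(d)$) and the target bound ($\tilde{O}(d^{-1/2})$) as slack. Since no feature-learning dynamics are active for $j'\notin\tau(\X)$ (the relevant cross-term $\langle\Wb_{4,j',r,1},e_{j'}\rangle$ that could grow in an analogous diagonal-amplification argument is absent because the supervision signal never points to $j'$), the analysis is substantially simpler than the $j\in\tau(\X)$ case and no two-phase tensor-power-method argument is required.
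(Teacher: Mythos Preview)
Your approach is correct in spirit but differs substantively from the paper's, and your quantitative claim contains an imprecision worth flagging.

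\textbf{The paper's argument.} The paper does not bound the cumulative drift quantitatively. Instead it observes from \Cref{lem-grad-2} that for $j'\notin\tau(\X)$ every gradient projection $\langle-\nabla_{\Wb_{4,j',r,p}}\Loss^{1},e_s\rangle$ is \emph{non-positive} (there is no ``positive'' term since $j'$ is never the target). Hence each $\langle\Wb^{(t)}_{4,j',r,p},e_s\rangle$ is non-increasing, giving the upper bound $\tilde{O}(\sigma_0)$ for free from initialization. For the lower bound the paper uses a self-limiting mechanism: once, say, $\langle\Wb^{(t)}_{4,j',r,2},e_{\tau(g)}\rangle$ drops to $-3\mu$, the pre-activation $\Lambda^{(t)}_{4,j',r}$ conditioned on $g_1=g$ becomes non-positive, killing the $\srelu'$ factor and halting any further decrease. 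This argument is purely structural, requires no tracking of constants, and is completely insensitive to the time horizon $T$.

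\textbf{Your argument and its imprecision.} Your route---bound $\logit^{(t)}_{4,j'}\leq O(1/d)$ and $\srelu'(\Lambda^{(t)}_{4,j',r})=\tilde{O}(\sigma_0^{q-1})$, then sum---does give the correct $\tilde{O}(\sigma_0)$ bound over the relevant stage-1 horizon $T_1=\tilde{O}(d/(\eta\sigma_0^{q-2}))$. However, your statement ``$d^{-\Omega(q)+O(1)}=o(\sigma_0)$ for $q$ sufficiently large'' is not right: the exponent of $d$ in $T_1$ is itself $q/2$, so the $q$-dependence \emph{cancels} and the cumulative drift is $\tilde{\Theta}(\sigma_0)$, independent of $q$. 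You cannot push it to $o(\sigma_0)$ by taking $q$ larger. This still yields the lemma's $\tilde{O}(\sigma_0)$ conclusion, but the reasoning that $q$ can be decoupled from the polynomial degree in $T$ is mistaken. The paper's sign-based argument sidesteps this entirely and would continue to hold even if the induction were asserted for a time horizon with polynomial degree exceeding $q/2$.
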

    \begin{proof}
   By \Cref{lem-grad-2}, $\langle\Wb^{(t)}_{4,j^{\prime},r,p}, e_{s}\rangle$ for $p\in\{1,3,4\}$ and $s\in\tau(\X)$ can be bounded in the similar way previous off-diagonal negative gradient.
   
We can observe that for $j^{\prime}\not\in\tau(\X)$, all the non-zero gradient on the different directions are negative gradient, which implies $\langle\Wb^{(t)}_{4,j^{\prime},r,p}, e_{s}\rangle \leq \langle\Wb^{(0)}_{4,j^{\prime},r,p}, e_{s}\rangle=\tilde{O}(\sigma_0) $. Moreover,  $\Lambda_{4,j^{\prime},r}^{(t)}\leq \tilde{O}(\sigma_0)$ is also non-increasing.  

For $s=\tau(g), g\in\G$, whenever $\langle\Wb^{(t)}_{4,j^{\prime},r,2}, e_{s}\rangle$ reaches $-3\mu$, we have $\varLambda_{4,j^{\prime},r}^{(t)}\big|_{g_1=g}\leq -3\mu+\frac{5}{2}\mu+\tilde{O}(\sigma_0)\leq 0$, and thus $\langle-\nabla_{ \Wb^{(t)}_{4,j,r, 2}}\Loss^{1} , e_{s}\rangle =\frac{1}{2} \mathbb{E}\Big[ -\logit^{(t)}_{4,j}\ReLU^{\prime}\big(\Lambda^{(t)}_{4, j,r}\big)\1_{ g_1=g}\Big]=0$, which implies $\langle\Wb^{(t)}_{4,j^{\prime},r,2}, e_{s}\rangle\geq -3\mu$. Hence,  
$|\langle\Wb^{(t)}_{4,j^{\prime},r,2}, e_{s}\rangle|\leq \tilde{O}(\sigma_0)$. Following the similar argument, we can prove the result for $\langle\Wb^{(t)}_{4,j^{\prime},r,5}, e_{s}\rangle$ for $s\in\tau(\Y)$. 
    \end{proof}
   \subsection{Convergence} 
   \begin{lemma}\label{lem-induct}
       For  $|\cG|\geq |\cY|\geq \Omega(\frac{\log\log d}{\log\log\log d})$,  $\polylog d\geq m\geq |\cY|$, $\varrho \ll \frac{1}{m\log d}$ and sufficiently small $\eta\leq \frac{1}{\poly d}$, \Cref{indutction-1.1.1} holds for all iterations $t\leq T=\frac{\poly d}{\eta}$. 
   \end{lemma}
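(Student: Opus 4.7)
The plan is a standard induction on $t$, with the per-step work delegated to the sub-lemmas already established in this section. The strategy is to verify clauses (a)--(d) at $t=0$ using the random initialization, and then, assuming they hold through iteration $t-1$, invoke the gradient lemmas above to extend them to iteration $t$.

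For the base case, under \Cref{assump:init} the entries $\Wb^{(0)}_{4,j,r}$ are i.i.d.\ $\mathcal{N}(0,\sigma_0^2)$ with $\sigma_0 = d^{-1/2}$, and $\{e_v:v\in\cV\setminus\{\blank\}\}$ is orthonormal. A standard Gaussian tail and union bound over the $\poly(d)$ inner products $\langle \Wb^{(0)}_{4,j,r,p}, e_s\rangle$ gives $|\langle \Wb^{(0)}_{4,j,r,p}, e_s\rangle| \leq \tilde{O}(\sigma_0)$ for every $(j,r,p,s)$, which verifies (b), (c), (d) at initialization. For the lower bound in (a), the bias $b_{i,j,r} = \sigma_0\log d = \mu$ directly yields $\langle \Wb^{(0)}_{4,j,r,1}, e_j\rangle + \mu \geq \tilde{\Omega}(\sigma_0)$ after subtracting the Gaussian lower tail.

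For the inductive step, assume Induction~\ref{indutction-1.1.1} holds at every iteration $<t$. Then \Cref{clm-lambda} and \Cref{clm-logit} apply at all previous steps, and we can invoke: \Cref{lem-1.1.1-growth} together with \Cref{lem-phi} to close clause (a), using that the diagonal update is non-negative whenever $\Phi^{(t-1)}_{4,j} \leq O(\log d)$ and that saturation triggers the $e^{-\Omega(\log^{1.5} d)}$ decay of $1-\logit$ that freezes further growth; \Cref{lem-1.1.1-group} to close (b); \Cref{lem-1.1.1-value} to close (c); and \Cref{lem-off-diagonal} together with the $p\in\{3,4\}$ off-diagonal bound and \Cref{lem-nontarget} to close (d). Each invocation uses only the inductive hypothesis up to time $t-1$, and the small step-size $\eta \leq 1/\poly(d)$ ensures that a single update perturbs any tracked inner product by at most $\tilde{O}(\eta/d)$, well below the inductive slack $\tilde{\Omega}(\sigma_0)$.

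The main obstacle will be the parameter accounting at the transition time $t \approx T_1$, where $\logit^{(t)}_{4,\tau(x_1)}$ moves from $O(1/d)$ toward $1 - o(1)$. Two competing error terms must remain $o(1)$ simultaneously along this transition: the $O(1/|\cG|)$ and $O(1/|\cY|)$ leakage in clauses (b) and (c), which requires $|\cG|, |\cY| \geq \Omega(\log\log d /\log\log\log d)$; and the neuron-budget slack $O(m\varrho\log d)$ appearing in the upper-bound step of \Cref{lem-phi}, which requires $\varrho \ll 1/(m\log d)$ and $m \leq \polylog d$. The constraint $m \geq |\cY|$ enters when the convergence argument in the subsequent section selects distinct active neurons per combination, and thus must already be propagated through the induction. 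Once these parameter regimes are checked to be mutually consistent, the induction closes for all $t \leq T = \poly(d)/\eta$.
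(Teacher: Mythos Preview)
Your proposal is correct and follows essentially the same approach as the paper: the paper's own proof is a one-liner that simply says ``Putting the results in \Cref{lem-phi,lem-1.1.1-group,lem-1.1.1-value,lem-off-diagonal,lem-nontarget}, we can directly establish the results in \Cref{indutction-1.1.1},'' and your proposal spells this out in more detail, adding the base-case verification from Gaussian initialization and an explicit parameter-accounting paragraph that the paper leaves implicit. The only minor note is that you additionally cite \Cref{lem-1.1.1-growth} for the non-decreasing part of clause (a), which the paper does not list explicitly in this proof but which is indeed what supplies the monotonicity (via the non-negative gradient in \Cref{lem-grad-1}(a)(1)); and your remark that the constraint $m\geq |\cY|$ is carried forward for the later neuron-per-combination argument rather than used here is accurate.
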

   \begin{proof}
          Putting the results in \Cref{lem-phi,lem-1.1.1-group,lem-1.1.1-value,lem-off-diagonal,lem-nontarget}, we can directly establish the results in \Cref{indutction-1.1.1}.
   \end{proof}
\begin{lemma}[Convergence]
    For sufficiently large $T_{1} \leq t=\frac{\poly d}{\eta}$, we have
 \begin{enumerate}[(a)]
 \item Objective convergence: $\Loss^{1} \leq \frac{1}{\poly d}$;
 \item Successful learning of diagonal feature: $\Phi_{4,j}^{(t)}\geq \Omega(\log d)$ for any $j\in\tau(\X)$.
 \end{enumerate}
\end{lemma}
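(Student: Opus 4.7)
The plan is to combine the induction hypothesis (\Cref{lem-induct}) with the diagonal growth dynamics from \Cref{lem-1.1.1-growth} and \Cref{lem-time}, and then convert the accumulated diagonal mass into a per-example logit bound via \Cref{clm-logit}. The structure is two-step: first push $\Phi_{4,j}^{(t)}$ above $\Omega(\log d)$ for every $j \in \tau(\X)$ (part (b)), then invoke the softmax characterization to bound the per-token loss (part (a)).

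First, by \Cref{lem-time}, for every $j \in \tau(\X)$ we have $\Gamma_{4,j}^{(t)} \geq \Lambda^- = \Theta(1/m)$ once $t \geq T_1 = \Theta(d/(\eta\sigma_0^{q-2}))$. Since $\Lambda^- \gg \varrho$ under the chosen scaling $m \ll 1/\varrho$, at least one neuron $r^\star(j)$ has $\langle \Wb_{4,j,r^\star(j),1}^{(t)}, e_j\rangle \geq \varrho$, placing it in the linear arm of $\srelu$. In that regime \Cref{lem-1.1.1-growth} simplifies to $\Gamma_{4,j}^{(t+1)} - \Gamma_{4,j}^{(t)} \geq \Theta(\eta/d)$, and this holds as long as $\Phi_{4,j}^{(t)} \leq 0.01 \log d$. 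Iterating for $O(d \log d / \eta)$ additional steps, still within the budget $T = \poly(d)/\eta$, yields $\Gamma_{4,j}^{(t)} \geq 0.01 \log d$ and hence $\Phi_{4,j}^{(t)} \geq \Gamma_{4,j}^{(t)} - \sigma_0 \log d \geq \Omega(\log d)$, establishing part (b).

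Next, I convert this into the loss bound. By \Cref{clm-logit}, on any input $\Zb^{1,0}$ with $\tau(x_1) = j$, once $\Phi_{4,j}^{(t)} \geq c \log d$ for a sufficiently large constant $c$, the numerator $e^{O(\Phi_{4,j}^{(t)})}$ exceeds $d^{1+\Omega(1)}$ and so $\logit_{4,j}^{(t)} \geq 1 - 1/\poly(d)$. Taking the expectation over the input distribution gives $\Loss^{1,1}_4(F^{(t)}) = \E[-\log \logit_{4,\tau(x_1)}^{(t)}] \leq 1/\poly(d)$. For $i \in \{1,2,3\}$, the target is the deterministic blank whose embedding is the zero vector, so by \Cref{indutction-1.1.1} together with \Cref{lem-nontarget} the competing non-blank logits stay $o(1)$ while the blank logit is driven above a constant by the bias $b_{i,\tau(\blank),r}$, giving $\Loss^{1,1}_i(F^{(t)}) \leq 1/\poly(d)$ by an analogous computation; the $i = 5$ token is handled in the next section on learning group actions. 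Summing the five per-token contributions yields $\Loss^1 \leq 1/\poly(d)$.

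The main obstacle I anticipate is ensuring growth proceeds \emph{uniformly} across all $j \in \tau(\X)$, since the bound must hold for the slowest class rather than the fastest. Two features of the setup rescue the argument. First, the diagonal gradient for class $j$ in \Cref{lem-grad-1}(a)(1) is driven by the indicator $\1\{\tau(x_1) = j\}$, which fires with probability $1/|\X|$ independently for each $j$; every class thus has its own dedicated order $\Theta(1/|\X|)$ signal and is not starved by progress on the others. Second, the linear-regime increment $\Theta(\eta/d)$ is independent of the current value of $\Phi_{4,j}^{(t)}$, so any class reaching $\Lambda^-$ at some $T_{1,j} \leq T_1$ catches up within $O(d \log d/\eta)$ further iterations. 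Softmax coupling across classes is benign because \Cref{clm-logit} makes the correct-class logit depend only on $\Phi_{4,j}$ for the correct $j$, while the $d$ in the denominator dominates the small contributions from the $\tilde{O}(\sigma_0)$ off-diagonal correlations controlled in \Cref{lem-off-diagonal,lem-nontarget}.
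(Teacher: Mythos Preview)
Your argument has a quantitative gap that breaks the deduction of part (a) from part (b). The growth lemma you invoke, \Cref{lem-1.1.1-growth}, carries the hypothesis ``$\Phi_{4,j}^{(t)}\le 0.01\log d$ or $\Gamma_{4,j}^{(t)}\le 0.01\log d/m$,'' so once $\Phi_{4,j}^{(t)}$ crosses $0.01\log d$ (and $\Gamma_{4,j}^{(t)}>0.01\log d/m$, which happens well before) the lemma no longer applies. Your direct iteration therefore only certifies $\Phi_{4,j}^{(t)}\gtrsim 0.01\log d$. But in \Cref{clm-logit} the correct-class logit is $e^{O(\Phi_{4,j})}/(e^{O(\Phi_{4,j})}+d)$, so to get $1-\logit_{4,j}\le 1/\poly(d)$ you need $\Phi_{4,j}^{(t)}\ge c\log d$ with $c>1$. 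You assert this (``once $\Phi_{4,j}^{(t)}\ge c\log d$ for a sufficiently large constant $c$'') without proving it, and the lemma you cite cannot carry you from $0.01\log d$ to $c\log d$ because in that range the factor $(1-\logit_{4,j})$ in the gradient is shrinking and must be tracked.

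The paper sidesteps this by reversing the order of (a) and (b). It argues (a) first, by contradiction against the \emph{upper} bound $\Phi_{4,j}^{(t)}\le\tilde O(1)$ from \Cref{lem-phi}: if $\E[1-\logit_{4,j^*}^{(t)}\mid\tau(x_1)=j^*]$ stayed above $1/d^n$ for $\Theta(d^{n+1}\log^2 d/\eta)$ steps, the gradient lower bound would push $\Gamma_{4,j^*}$ past $\Omega(\log^2 d)$, violating \Cref{lem-phi}. Hence $1-\logit_{4,j}\le 1/d^n$ eventually, giving $\Loss^1\le 1/\poly(d)$. Part (b) then falls out: the only way $\logit_{4,j}$ can be this close to $1$ is if $\Phi_{4,j}^{(t)}\ge\Omega(\log d)$ with a constant exceeding $1$. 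Your direction of deduction could in principle be made to work, but you would need an explicit growth argument through the regime $\Phi_{4,j}\in[0.01\log d,\,(1+\epsilon)\log d]$, and the cleanest route is exactly the contradiction the paper uses. Also note the $\Loss^1$ in this section refers only to the $i=4$ token; your side discussion of $i\in\{1,2,3,5\}$ is extraneous here.
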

\begin{proof}
    Assuming for some sufficiently large constant $n>0$,  $\mathbb{E}[(1-\logit_{4,j^{*}}^{(t)})\mid {\tau(x_1)=j^{*}}]\geq \Omega(\frac{1}{d^n})$ for $t\in (T_{1}, T_{1}+\frac{d^{n+1} \log^2 d}{\eta}]$ then by \Cref{lem-grad-1}, we have 
    \begin{align*}
        \Gamma_{4,j^{(*)}}^{( T_{1}+\frac{d^2\log^2 d}{\eta})}\geq \Omega\Big(\frac{\eta}{d^{n+1}}\Big)\cdot \frac{d^{n+1}\log^2 d}{\eta} +\Gamma_{4,j^{(*)}}^{(t)}\geq \Omega(\log^2d),
    \end{align*}
    which contradicts with $ \Gamma_{4,j^{(*)}}^{(t)}\leq \Phi_{4,j^{(*)}}^{(t)} \leq O(\log^{1.5}d)=\tilde{O}(1)$ in the polynomial time. This implies after sufficiently large iteration $t$, we must have $\mathbb{E}[(1-\logit_{4,j}^{(t)})\mid {\tau(x_1)=j}]\leq O(\frac{1}{d^n})$ for $j\in\tau(\X)$. Hence
    \begin{align*}
        \Loss^{1} =\E[-\log \logit_{4,\tau(x_1)}^{(t)}] &=\sum_{j\in\tau(\X)}\mathbb{E}[-\log \logit_{4,j}^{(t)}\1_{\tau(x_1)=j}]\\
        &\leq \sum_{j\in\tau(\X)}\mathbb{E}[O(1)\big(1-\logit_{4,j}^{(t)}\big)\1_{\tau(x_1)=j}] \tag{$\logit_{4,j}^{(t)}$ is very close to $1$}\\
        &\leq  O\Big(\frac{1}{\poly d}\Big).
    \end{align*}
    By \Cref{clm-logit}, at the time of convergence, we must have  $\Phi_{4,j}^{(t)}\geq \Omega(\log d)$. 
\end{proof}

\section{Learning Simply Transitive Actions}\label{appendix:learning-simply-transitive-actions}



\subsection{Preliminaries}

\begin{definition}[Combinations \(\phi, \Phi, \Phi^\dagger\)]\label{def:feature-combinations-simply}
    Let \(\cG\), \(\cY\) be defined as in \cref{def:lego}. For any pair \((g, y) \in \cG \times \cY\), we call \(\phi = (g,y)\) a \textbf{combination}. We write \(\phi_1 = g\) and \(\phi_2 = y\) to denote the corresponding components. We further define the set of \textbf{correct combinations} \(\Phi^\star\) by
    \begin{equation}\label{eqdef:combination-set-simply}
        \Phi_j^\star = \{\phi = (g,y) \mid j = \tau(\alpha(g,y))\},\ \forall j \in \tau(\cY); \qquad \Phi = \bigcup_{j\in \tau(\cY)} \Phi_j^\star
    \end{equation}
    The combinations \(\phi \in \Phi_j^\star\) are the ones that can be composed to get \(\tilde{y} = \tau^{-1}(j) \in \cY\). We also define the set of \textbf{incorrect combinations} \(\Phi^\dagger_j\) for each \(j\in\tau(\cY)\) to be
    \begin{equation}\label{eqdef:confounding-combinations-simply}
        \Phi^\dagger_j = \{\phi \in \Phi\setminus \Phi_j^\star\}
    \end{equation}
    For simply transitive action \(\alpha: \cG \times \cY \to \cY\), any combination \(\phi \in \Phi^\dagger_j\) is a \emph{incorrect} pair of \(g\) and \(y\). That is, it holds that any \(y' \neq y \in \cY\) and \(g' \neq g \in \cG\) satisfy \(\alpha(g,y') = \alpha(y,g') \neq \tau^{-1}(j)\).
\end{definition}

For each combination, we define the following neural-features:

\begin{definition}[combination features \(\psi, \Psi, \Psi^{\star}, \Psi^\dagger\)]
    Given a combination \(\phi = (g,y) \in \Phi\), token index \(j\in\tau(\cY)\) and neuron index \(r \in [m]\), we define
    \begin{align*}
        V_{j,r}(g) = \vbrack{\Wb_{5,j,r,2}, e_g}, \quad V_{j,r}(y) = \vbrack{\Wb_{5,j,r,5}, e_y}, \quad V_{j,r}(\phi) := \frac{1}{2}(V_{j,r}(g) + V_{j,r}(y))
    \end{align*}
    and we call \(V_{j,r}(\phi)\) the \emph{features for combination} \(\phi\) in neuron \((j,r) \in \tau(\cY)\times [m]\). We write \(\psi = (j,r,\phi)\) and \(V_\psi \equiv V_{j,r}(\phi)\) to make the notation concise. Similar to above, we further define 
    \[\Psi := \{\psi = (j,r, \phi) \times \tau(\cY)\times[m]\times \Phi\}\] 
    and \(\Psi^\star\) that contains the desirable features for each class \(j \in \tau(\cY)\):
    \begin{align}\label{eqdef:combination-features-set-simply}
        \Psi^{\star}_{j,r} = \{\psi = (j,r,\phi) \mid \phi \in \Phi_j^\star\},\quad \Psi^{\star} = \bigcup_{(j,r)\in \tau(\cY)\times[m]} \Psi_{j,r}
    \end{align}
    and \(\Psi^\dagger\) that contains the incorrect feature combinations:
    \begin{align}\label{eqdef:incorrect-combination-features-set-simply}
        \Psi^\dagger_{j,r} = \{\psi = (j,r,\phi) \mid \phi \in \Phi\setminus \Phi_j^\star\},\quad \Psi^\dagger = \bigcup_{(j,r)\in \tau(\cY)\times[m]}\wh{\Psi}_{j,r}
    \end{align}
\end{definition}

\paragraph{Events of combination appearance.} Let us first define some useful notations. For a combination \(\phi = (g,y)\), we write \(\cH_\phi\) to denote the event when \(\phi\) appears in the sequence \(\Zb^1\):
\[\cH_\phi := \{(g_1, y_0) = \phi\}, \qquad \cH_{\phi_1} = \cH_{g} = \{g_1 = g\},\qquad \cH_{\phi_2} = \cH_y = \{y_0 = y\}.\] 
Further, we write
\[\cH^\dagger_{\phi,1} = \{g_1 \neq g, y_0 = y\},\quad \cH^\dagger_{\phi,2} = \{g_1 = g, y_0 \neq y\},\quad \cH^\dagger_{\phi} = \cH^\dagger_{\phi,1} \cup \cH^\dagger_{\phi,2} \]
to denote the event where \(\phi\) did not appear but its group element or value is the same.

Finally we define a notion called \emph{learning curriculum}, that sits at the center of our proof.
\begin{definition}[Learning curriculum]\label{def:learning-curriculum-simply}
    We define an order on the set \(\Sigma\), defined by the following process: Let \(\Sigma_0 = \Psi\). At each \(i\in[n_y^2]\), we choose 
    \begin{align*}
        \psi_{i} = \arg\max_{\psi\in\Sigma_{i}} V_\psi^{(0)}
    \end{align*}
    Let's write \(\psi_{i} = (j,r,\phi)\) where \(\phi = (g,y)\), then we define \(\Sigma_{i+1}\) by excluding the following features:
    \begin{enumerate}[1.]
        \item Exclude the confusing combinations in neuron \((j,r)\);
        \[
            \Sigma_{\psi_{i}}^{\dagger, 1} \equiv \Sigma_i^{\dagger, 1} = \{\psi' = (j,r,\phi') \in \Sigma_{i}, \phi' = (g',y')\mid (g' = g) \textbf{ XOR } (y' = y)\}
        \] 
        \item Exclude the unselected combinations in neuron \((j,r)\);
        \[
            \Sigma_{\psi_{i}}^{\dagger, 2} \equiv \Sigma_i^{\dagger, 2} = \{ \psi' = (j, r, \phi') \in \Sigma_{i}, \phi' = (g',y')\mid g'\neq g, y'\neq y\}
        \]
        \item Exclude the feature indices of the same combination in other neurons \((j,r'), r'\neq r\);
        \[
            \Sigma_{\psi_{i}}^{\dagger, 3} \equiv \Sigma_i^{\dagger, 3} = \{ \psi' = (j, r', \phi) \in \Sigma_{i}, \forall r'\neq r \in [m]\}.
        \] 
    \end{enumerate}
    Which returns \(\Sigma_{i+1} = \Sigma_i \setminus \{\psi = (j,r',\phi') \in \Sigma_i \mid \text{either } r'= r \text{ or } \phi' = \phi \}\). Iterate over the whole set \(\Psi\), we will arive at \(\Sigma_{n_y^2} = \varnothing\). This gives rise to an ordered sequence \(\Sigma^{\star}\) and a set \(\Sigma^{\dagger}\):
    \begin{align*}
        \Sigma^{\star} := (\psi_1, \psi_2, \dots, \psi_{n_y^2}); \qquad \Sigma^\dagger := \bigcup\nolimits_{\psi \in \Sigma^\star}\Sigma^\dagger_\psi,\qquad \Sigma^{\dagger}_{\psi} := \Sigma_\psi^{\dagger, 1}\cup \Sigma_\psi^{\dagger, 2}\cup \Sigma_\psi^{\dagger, 3}
    \end{align*}
    By our construction \(V_{\psi_i}^{(0)} \geq V_{\psi_{i+1}}^{(0)} \).We write \(\psi \prec_{\Sigma} \psi'\) to denote that \(\psi\) is ahead of \(\psi'\) in \( \Sigma^{\star}\).
\end{definition}
Intuitively, \(\Sigma^{\star}\) encodes the order at which the features \(V_\psi\) grow in magnitude and \(\Sigma^\dagger\) contains all the unlearned features during the process.

Throughout the analysis of the FFN layer, we further make the following assumptions.

\begin{assumption}\label{assump:no-learning-x}
    For \((j,r,p) \in [d]\times[m]\times[5]\), we fix \(\Wb_{5,j,r,p}^{(t)} \equiv \Wb_{5,j,r,p}^{(0)}\) for \(p \in \{1,3,4\}\) at initialization for simplicity of proof.
\end{assumption}

\begin{assumption}\label{assump:modify-relu}
We use a modified smoothed ReLU as our activation function. This technical assumption is to avoid many pathologies appearing in the learning dynamics.
\begin{equation*}
    \mathbf{sReLU}(x) := \begin{cases} \varpi B &  x \leq -B \\ -\varpi x &  x \in (-B,-\varpi] \\ \frac{1}{2}x^2 & x \in (-\varpi, 0] ; \\ x^q/(\varrho^{q-1}q) & x \in (0, \varrho] ; \\ x-\varrho\left(1-\frac{1}{q}\right)  &  x \in  (\varrho, B] \\ B -\varrho\left(1-\frac{1}{q}\right)  &  x > B \end{cases}
\end{equation*}
where \(q  = O(1)\) is a large even integer and \(\varrho = \Theta(1/\polylog(d)), \varpi \in (d^2 \mu^{q-1}, \frac{\lambda}{d^{q/3}})\), \(B = C\log d, C=\Theta(1) \in (5,\frac{q-1}{3})\) and \(\lambda = \frac{d-1}{d-1+e^B}\).
\end{assumption}

\subsubsection{Theorem Statement}

We will try to prove the following theorem:
\begin{theorem}[Learning Simply Transitive Actions]\label{thm:mlp-simply-transitive}
    Suppose \(F^{(t)}\) is returned by Algorithm~\ref{alg:cot-transitive-training} at \(t=T_1\), and let \(\delta_1 = d^{c_1}\mu, \delta_2 = \varpi^{\frac{1}{q-1}}\), it holds that with probability \(\geq 1-o(1)\):
    \begin{enumerate}[A.]
        \item For all \(\psi \in \Sigma^{\star}\), we have \(V_\psi^{(T_1)} = B \pm O(\delta_1)\);
        \item For any \(\psi \in \Sigma^{\dagger,1}\), \(|V_{\psi}^{(T_1)}| \leq \tO(\delta_2)\);
        \item For any \(\psi \in \Sigma^{\dagger,2}\), \(V_{\psi}^{(T_1)} \leq -B + O(\delta_1)\);
        \item For any \(\psi \in \Sigma^{\dagger,3}\), \(V_{\psi}^{(T_1)} \leq O(\delta_1)\);
        \item For \(\psi = (j,r, (g,y)) \in \Sigma^\star\), \(|V_{j,r}^{(t)}(g) - V_{j,r}^{(t)}(y)| \leq \tO(\delta_2)\).
    \end{enumerate}
\end{theorem}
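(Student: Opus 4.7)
My plan is to prove Theorem~\ref{thm:mlp-simply-transitive} by induction on the learning curriculum $\Sigma^\star = (\psi_1, \ldots, \psi_{n_y^2})$. The induction hypothesis at step $i$ will simultaneously carry staged versions of (A)--(E): for every $\psi_j$ with $j<i$ the claimed target properties already hold, for the current $\psi_i$ the feature $V_{\psi_i}^{(t)}$ evolves along one of two well-understood phases below, and for every $\psi \in \Sigma^\star$ that has not yet been reached (i.e., $\psi_i \prec_\Sigma \psi$) the feature is still pinned within $\widetilde{O}(\sigma_0)$ of its initialization. Under \Cref{assump:no-learning-x} the FFN input is exactly $\tfrac{1}{2}\Zb_{\pred,1}+\tfrac{1}{2}\Zb_{\ans,0}$, so the pre-activation $\Lambda_{5,j,r}$ decomposes as $\tfrac{1}{2}V_{j,r}(g_1)+\tfrac{1}{2}V_{j,r}(y_0)+b_{5,j,r}$ plus negligible terms from the fixed blocks. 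From this I can compute $-\nabla_{\Wb_{5,j,r,2}}\Loss^1$ and $-\nabla_{\Wb_{5,j,r,5}}\Loss^1$ and split them according to the events $\cH_\phi$, $\cH^\dagger_{\phi,1}$, $\cH^\dagger_{\phi,2}$; these gradient identities will be the workhorse for every subsequent step.

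The first phase (emergence) proceeds by a tensor power argument on $\mathbf{sReLU}$ in its polynomial regime. While all $V_{\psi}$ in the current residual pool remain below $\varrho$, I will show the update is dominated by
\begin{equation*}
V_{\psi}^{(t+1)} \;\geq\; V_{\psi}^{(t)} + \widetilde{\Omega}(\eta)\bigl(b_{5,j,r}+V_{\psi}^{(t)} \pm o(\mu)\bigr)^{q-1},
\end{equation*}
which is exactly the TPM growth of \eqref{eq:phase-1-tpm-overview}. Because $V_{\psi_i}^{(0)}$ is maximal over the remaining $\Sigma_i$ by \Cref{def:learning-curriculum-simply}, the standard TPM lemma gives that $V_{\psi_i}$ crosses $\varrho$ before any later $\psi$ in $\Sigma^\star$ exceeds $\widetilde{O}(\sigma_0)$, giving item (D) (same combination in another neuron $r'\neq r$: those never enter the polynomial regime because the winning neuron's logit suppresses their gradient through the softmax factor $1-\logit_{5,j}$) and leaving later $\psi \in \Sigma^\star$ untouched.

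The second phase (growth plus cancellation) is where the structure of the desired features is written in. Once $V_{\psi_i}$ enters the linear regime, $\logit_{5,j}$ becomes $\Theta(1)$ on the event $\cH_{\phi_i}$, and the gradient on each of the two half-features $V_{j,r}(g)$ and $V_{j,r}(y)$ factors as (positive contribution from $\cH_{\phi_i}$) minus (negative contribution from $\cH^\dagger_{\phi_i,1}$ or $\cH^\dagger_{\phi_i,2}$). The positive push drives both $V_{j,r}(g)$ and $V_{j,r}(y)$ toward $B$, giving item (A); the negative push drives $V_{j,r}(g'),V_{j,r}(y')$ for $g'\neq g,\,y'\neq y$ toward $-B$, giving item (C) by summing. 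For item (B), the gradients on $V_{j,r}(y)$ from $\cH^\dagger_{\phi,2}$ and on $V_{j,r}(g)$ from $\cH^\dagger_{\phi,1}$ are of opposite sign and, after multiplication by $\mathbf{sReLU}'$ in its negative-tail regime where the slope is $\varpi$, produce a residual of order $\varpi\cdot B^{q-1}$ per step; a Lyapunov argument on the sum $V_{j,r}(g)+V_{j,r}(y')$ then shows this quantity is clamped at $\widetilde{O}(\delta_2)=\widetilde{O}(\varpi^{1/(q-1)})$ at convergence. Item (E) follows from the exact symmetry of the gradient on $V_{j,r}(g)$ versus $V_{j,r}(y)$ under the uniform marginal in \Cref{assump:lego-data-distribution}: any imbalance produces a restoring gradient of the same order.

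The step I expect to be the main obstacle is the cancellation in Phase~II. The features $V_{j,r}(g)$ and $V_{j,r}(y')$ (and, symmetrically, $V_{j,r}(g')$ and $V_{j,r}(y)$) are coupled through a common event, so neither admits an independent monotone bound. The proof will hinge on identifying a stationarity condition: at the fixed point of the dynamics, the positive gradient from $\cH_\phi$ on $V_{j,r}(g)$ exactly balances the negative gradient from $\cH^\dagger_{\phi,2}$, which pins $V_{j,r}(g)+V_{j,r}(y')$ at $O(\delta_2)$ for the whole of $\Sigma^{\dagger,1}$ rather than $O(1)$. Concurrently I must show persistence of already-learned $\psi_1,\ldots,\psi_{i-1}$: leakage into $V_{\psi_j}$ through other classes $j'\neq j$ is controlled by $1-\logit_{5,j'}\leq 1/\poly(d)$ from \Cref{clm-logit}-style arguments combined with the upper clipping at $B$ (\Cref{assumption:output-bound}), so the drift per step is $\eta/\poly(d)$ and stays inside the $O(\delta_1)$ band over the polynomial horizon $T_1$. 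Once these pieces are in place, iterating the induction through all $n_y^2$ steps of $\Sigma^\star$ yields Theorem~\ref{thm:mlp-simply-transitive}.
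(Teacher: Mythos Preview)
Your high-level structure matches the paper's: induction along the curriculum $\Sigma^\star$, tensor-power emergence in Phase~I (the paper does this via a proxy sequence $\widetilde V_\psi$ in Lemmas~\ref{lem:grad-proxy-simply}--\ref{lem:competition-simply}), and a cancellation analysis in Phase~II. The identification of the cancellation as the hard step, and of $\varpi$ as the residual scale, is correct.

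Two technical mechanisms differ from the paper and would need reworking. For~(B), you propose a Lyapunov argument on each individual sum $V_{j,r}(g)+V_{j,r}(y')$ converging to a stationary point where positive and negative gradients balance. The paper instead introduces the aggregate
\[
\Upsilon^{(t)}\;:=\;V_{j,r}^{(t)}(\phi)\;-\!\!\sum_{\phi'\in\Phi_j^\star\setminus\{\phi\}}\!\!V_{j,r}^{(t)}(\phi')
\]
and shows its one-step update is $\eta\,\Gamma_\psi^{+,(t)}\pm O(\eta\varpi)$; the key point is that the $\Gamma^-$ terms \emph{telescope} only when you sum over all $\phi'\in\Phi_j^\star\setminus\{\phi\}$ (Lemma~\ref{lem:grad-stationarity-simply}). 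This yields a counting bound on the number of iterations where $\cC_\psi^\pm(\delta)$ can fail, which then forces $|\ReLU'(V_{j,r}(\phi'))|\lesssim d\varpi/\lambda$ and hence $|V_{j,r}(\phi')|\lesssim (d\varpi/\lambda)^{1/(q-1)}$. Your individual-sum approach does not get this telescoping, so the cross-event terms (contributions from $\cH_{(g'',y'')}$ with $g''\ne g,\,y''\ne y'$) remain uncontrolled and would have to be handled separately; the paper's aggregate is precisely what absorbs them. For~(E), the paper does not use a restoring-gradient argument; it uses the exact algebraic conservation $\sum_g V_{j,r}(g)=\sum_y V_{j,r}(y)\pm O(\mu)$, which follows by summing the update identities, together with the already-established~(B) to solve for $V_{j,r}(g)-V_{j,r}(y)$ (Lemma~\ref{lem:symmetry-of-features-simply}).
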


In fact, we can decompose the statement of \cref{thm:mlp-simply-transitive} into the following statement for every feature \(\psi \in \Sigma^\star\).
\begin{definition}[Feature Shape]\label{def:feature-shape-simply}
    Let \(\delta = (\delta_1, \delta_2)\) be a tuple of error parameters. Let \(\psi \in \Sigma^\star\) be a feature in the learning curriculum. We say the feature \(\psi\) reached \emph{feature shape} \(\cF_{\psi}(\delta)\) with error \(\delta\) if:
    \begin{enumerate}[1.]
        \item \(\cF_{\psi,1}(\delta_1)\): \(V_\psi^{(t)} \geq B - O(\delta_1)\);
        \item \(\cF_{\psi,2}(\delta_2)\): For any \(\psi' \in \Sigma_\psi^{\dagger, 1}\), it holds that \(|V_{\psi'}^{(t)}| \leq \tO(\delta_2)\);
        \item \(\cF_{\psi,3}(\delta_1)\): For any \(\psi' \in \Sigma_\psi^{\dagger, 2}\), it holds that \(V_{\psi'}^{(t)} \leq -B \pm O(\delta_1)\);
        \item \(\cF_{\psi,4}(\delta_1)\): For any feature \(\psi' \in \Sigma_\psi^{\dagger,3}\), it holds that \(V_{\psi'}^{(t)} \leq O(\delta_1)\).
        \item \(\cF_{\psi,5}(\delta_2)\): Writing \(\psi = (j,r,(g,y))\), then it holds that \(|V_{j,r}^{(t)}(g) - V_{j,r}^{(t)}(y)| \leq \tO(\delta_2)\);
    \end{enumerate}
\end{definition}

Obviously if we proved the condition \(\cF_\psi(\delta)\) of all \(\psi \in \Sigma^\star\) are reached for \(\delta = (d^{c_1}\mu, \varpi^{\frac{1}{q-1}})\) at \(T_1\), then \cref{thm:mlp-simply-transitive} is proven. We do this by following a induction process sequentially for each \(\psi\in\Sigma^\star\), and provide a guarantee that all features in \(\Sigma^\star\) reach the claimed convergence condition together at \(t = T_1\).


\subsubsection{Induction Hypotheses and Phase Decomposition}

To prove \cref{thm:mlp-simply-transitive}, or equivalently, to prove the convergence condition in \cref{def:feature-shape-simply} is reached for every feature \(\psi \in \Sigma^\star\), we shall charaterize the dynamics of each feature and prove that they eventually arrive at the desired shape.

\paragraph{Phase Decomposition.} Let us define the following timestamps for different phases of learning each \(\psi = (j,r,\phi) \in \Psi \cup \wh{\Psi}\). Let \(c_1 = \frac{1}{1000}\) be a small constant:
\begin{enumerate}[(a)]
    \item Phase I: \(t \in [0, T_{\psi}^1]\), where \(T_{\psi}^1 := \min\{t \geq 0 \mid V_\psi^{(t)}\geq d^{c_1}\mu\}\);
    \item Phase II.1: \(t \in (T_{\psi}^1, T_{\psi}^{2,1}]\), where \(T_{\psi}^{2,1} := \min\{t \geq 0 \mid V_\psi^{(t)}\geq \frac{1}{2}\log d\}\);
    \item Phase II.2: \(t \in (T_{\psi}^{2,1}, T_{\psi}^{2,2}]\), where \(T_{\psi}^{2,2} := \min\{t \geq 0 \mid \E[\logit_{5,j}^{(t)}\1_{\cH_\phi}] \geq 1 - \frac{1}{\sqrt{d}}\}\);
    \item Phase II.3: \(t \in (T_{\psi}^{2,2}, T_{\psi}^{2}]\), where \(T_\psi^2 = \max \{T_\psi^{2,3}, T_\psi^{2,2} + \Omega(\frac{1}{\eta (d^{c_1}\mu)^{q-2} })\}\) and \(T_\psi^{2,3} \) is defined as:
    \[
        T_\psi^{2,3} := \min\Big\{t \geq T_{\psi}^{2,2} \mid \cF_\psi(\delta_1,\delta_2)\text{ holds, where } \delta_1 = d^{c_1}\sigma_0, \delta_2 = (\frac{n_y^2d\varpi}{\lambda})^{\frac{1}{q-1}}\Big\}
    \]
    \item Phase III.1: \(t\in(T^{2}_{\psi}, T_{1,1}]\), where \(T_{1,1} = T_{\psi_{n_y^2}}^2\) and \(\psi_{n_y^2}\) is the last feature in \(\Sigma^\star\). This is the convergence phase where the rest of the features are learned and the feature converge to a perfect shape.
    \item Phase III.2: \(t \in (T_{1,1}, T_1]\), the end phase where all features are in perfect shape and have stablized.
\end{enumerate}

Now we state the following induction hypotheses which naturally results in \cref{thm:mlp-simply-transitive}.

\begin{induction}[Simply Transitive Actions, All Phases]\label{induction:mlp-simply-transitive}
    For all \(t \leq T_1\), the following holds:
    \begin{enumerate}[(a)]
        \item \(V_{\psi}^{(t)} + b_{i,j,r}\geq \Omega(\mu)\).
        \item \(T_{\psi'}^2 < T_{\psi}^1\) if \(\psi'\prec_\Sigma \psi\) for \(\psi, \psi' \in \Sigma^\star\), i.e., the intervals \(\{[T_{\psi}^1, T_{\psi}^2]\}_{\psi \in \Sigma^\star}\) are non-overlapping;
    \end{enumerate}
\end{induction}

In order to prove \cref{induction:mlp-simply-transitive,thm:mlp-simply-transitive}, we introduce the following feature based induction hypothesis to prove that \(\cF_\psi\) defined in \cref{def:feature-shape-simply} holds for some error parameters at \(t =T_{\psi}^2\).

\begin{induction}[Induction for Individual Feature]\label{induction:individual-feature-simply}
    Let \(\psi = (j,r,\phi)\in \Sigma^\star, \phi = (g,y)\), at \(t \leq T_1\), the following holds:
    \begin{enumerate}[(a)]
        \item At \(t \leq T_{\psi}^1\), any \(g\neq g', y'\neq y\) has \(V_{j,r}(g'), V_{j,r}(y') \leq \tO(\mu)\).
        \item During \(t \leq T_\psi^1\), there are at most \( \tO(\sqrt{d}/\eta)\) iterations where \(\logit^{(t)} \geq \frac{1}{\sqrt{d}}\) conditioned on \(\cH_\phi\).
        \item Let \(\psi' \prec \psi \in \Sigma^\star\), the feature shape \(\cF_{\psi'}(d^{c_1}\sigma_0, (\frac{n_y^2 d\varpi}{\lambda})^{\frac{1}{q-1}})\) holds throughout \(t \in [T_{\psi}^1, T_1]\);
        \item Let \(\psi' \succ \psi \in \Sigma^\star\), then \(V_{\psi}^{(t)} \leq \tO(\sigma_0)\) throughout \(t \in [0, T_{\psi}^2]\).
    \end{enumerate}
\end{induction}

\subsubsection{Technical Calculations}

Let us recall some facts about gradient computation for \(\Wb\).

\begin{fact}[gradient computation]\label{fact:grad-computation-simply}
    The gradient with respect to \(\Wb_{i,j,r,p}\) for \(i = 5\), \(j \in [d]\), \(r \in [m]\), \(p \in [5]\) and \(v \in \cV\) is 
    \begin{equation}\label{eqdef:grad-expression-Z5-simply}
        \vbrack{-\nabla_{ \Wb_{5,j,r,p}}\Loss, e_v} = \frac{1}{2} \mathbb{E}\Big[ \Ecal_{5,j}\ReLU^{\prime}\big(\Lambda_{5, j,r}\big)\sum_{\kk\in\cI^{1,0}}\vbrack{\Zb_{\kk, p}, e_v}\Big]
    \end{equation}
    where \(\cE_{5,j} = \1_{v=Z_{\ans,1,p}} - \logit_{5,j}\) is the loss derivatives. We compute more precise expressions when \(j\), \(p\) and \(v\) varies:
    \begin{enumerate}[(a)]
        \item when \(p = 2\), for \(g \in \cG\), let \(\phi = (g,y) \in \Phi^\star_j\) be the combination with \(\phi_1 = g\), we have for all \(j\in\tau(\cY), r\in[m]\),
        \begin{align}
            \vbrack{-\nabla_{\Wb_{5,j,r,2}}\Loss, e_g} = &\frac{1}{2}\E\Big[(1 - \logit_{5,j})\ReLU'(\Lambda_{5,j,r})\1_{\cH_\phi}\Big] \nonumber\\
            & \qquad - \frac{1}{2}\E\Big[\logit_{5,j}\ReLU'(\Lambda_{5,j,r})\1_{\cH^\dagger_{\phi,2}}\Big] \label{eq:grad-computation-g-simply}
        \end{align}
        \item when \(p = 5\), for \(y \in \cY\), let \(\phi = (g,y) \in \Phi^\star_j\) be the combination with \(\phi_2 = y\),  we have for all \(j\in\tau(\cY), r\in[m]\),
        \begin{align}
            \vbrack{-\nabla_{\Wb_{5,j,r,5}}\Loss, e_y} = &\frac{1}{2}\E\Big[(1 - \logit_{5,j})\ReLU'(\Lambda_{5,j,r})\1_{\cH_\phi}\Big] \nonumber \\
            & \qquad - \frac{1}{2}\E\Big[\logit_{5,j}\ReLU'(\Lambda_{5,j,r})\1_{\cH^\dagger_{\phi,1}}\Big] \label{eq:grad-computation-y-simply}
        \end{align}
        \item For any combination of \(p\), \(v\) not included above, we have due to our assumptions of the distribution and update rule:
        \begin{align*}
            \vbrack{-\nabla_{\Wb_{5,j,r,p}}\Loss, e_v} \equiv 0
        \end{align*}
    \end{enumerate}
\end{fact}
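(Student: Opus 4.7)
The plan is to unfold the chain rule through the softmax cross-entropy, the sReLU activation, and the attention layer, and then collapse the resulting expression using (i) the uniform-attention state guaranteed by $\Qb^{(0)}=\mathbf{0}_{d_c\times d_c}$ together with $\Qb$ being held fixed during Stage~1, (ii) the orthonormality of the token embeddings from \Cref{def:token-embedding}, and (iii) the simply transitive structure of the action from \Cref{assump:structure-1}. This yields the general identity \eqref{eqdef:grad-expression-Z5-simply} and then its specializations (a)--(c).

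First I would set up the chain rule. Since $\Loss^1=\sum_{i=1}^{5}\Loss^1_i$ and $\Wb_{5,j,r,p}$ enters only the $i=5$ head, the relevant loss is $\Loss^1_5=\E[-\log\logit_{5,\tau(y_1)}]$. The standard softmax cross-entropy derivative gives $\partial\Loss^1_5/\partial[F_5]_j = -\Ecal_{5,j}$ with $\Ecal_{5,j}$ defined in \eqref{eq-def-Ecal-icl}, the smooth ReLU contributes a factor $\srelu'(\Lambda_{5,j,r})$, and the affine pre-activation yields $\partial\Lambda_{5,j,r}/\partial\Wb_{5,j,r,p} = \sum_{\kk\in\cI^{1,0}}\attn_{\ans,0\to\kk}\Zb_{\kk,p}$. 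Under the Stage~1 schedule of Algorithm~\ref{alg:cot-transitive-training}, $\Qb$ is frozen at $\mathbf{0}$, so $\attn_{\ans,0\to\pred,1}=\attn_{\ans,0\to\ans,0}=\tfrac{1}{2}$; pulling this common factor of $\tfrac{1}{2}$ outside produces the general expression \eqref{eqdef:grad-expression-Z5-simply}.

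Next I would evaluate the inner product $\langle \Zb_{\kk,p},e_v\rangle$ slot-by-slot using the explicit embeddings $\Zb_{\pred,1}=(e_{x_1},e_{g_1},e_{x_0},\mathbf{0},\mathbf{0})$ and $\Zb_{\ans,0}=(\mathbf{0},\mathbf{0},\mathbf{0},e_{x_0},e_{y_0})$. For (a), $p=2$ and $v=e_g$ isolates the predicate slot and gives $\langle \Zb_{\pred,1,2},e_g\rangle=\1\{g_1=g\}$; for (b), $p=5$ and $v=e_y$ isolates the answer slot and gives $\langle \Zb_{\ans,0,5},e_y\rangle=\1\{y_0=y\}$. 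Every other choice of $(p,v)$ in (c) either orthogonally annihilates the sum (for instance $p=2$ with $v\in\tau(\cY)$, or $p=5$ with $v\in\tau(\cG)\cup\tau(\cX)$, by the orthonormality in \Cref{def:token-embedding}) or is frozen by \Cref{assump:no-learning-x} (the slots $p\in\{1,3,4\}$), so the effective update is declared zero.

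The only step requiring an algebraic ingredient is recombining $\Ecal_{5,j}=\1\{\tau(y_1)=j\}-\logit_{5,j}$ with the indicator $\1\{g_1=g\}$ (respectively $\1\{y_0=y\}$) into the combination-event form $\cH_\phi,\cH^\dagger_{\phi,\cdot}$. Here I would invoke \Cref{assump:structure-1}: for fixed $j\in\tau(\cY)$ and fixed $g\in\cG$, simple transitivity provides a \emph{unique} $y\in\cY$ with $\tau(g\cdot y)=j$, so $\{g_1=g\}\cap\{\tau(y_1)=j\}=\cH_\phi$ while the complement of $\cH_\phi$ inside $\{g_1=g\}$ is precisely $\cH^\dagger_{\phi,2}$; substituting yields \eqref{eq:grad-computation-g-simply}. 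The derivation of \eqref{eq:grad-computation-y-simply} is symmetric, swapping the roles of $g$ and $y$ to produce $\cH_\phi$ and $\cH^\dagger_{\phi,1}$. There is no real obstacle here---this is a bookkeeping fact---but the simply transitive structure is essential for the clean two-term decomposition; without it, $\{\tau(y_1)=j\}$ would intersect $\{g_1=g\}$ in many cells and the neat $\cH_\phi$/$\cH^\dagger_{\phi,2}$ split would fail, which is precisely why \Cref{sec:symmetry} later requires a different treatment via fibers.
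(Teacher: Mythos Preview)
Your proposal is correct and matches the paper's approach: the paper states this fact without an explicit proof, treating it as a direct computation that follows from the general gradient expression in \Cref{fact-icl-gd} specialized to $L=1$, $i=5$, and uniform attention $\tfrac{1}{2}$. Your chain-rule derivation, orthonormality-based slot analysis, and use of simple transitivity to obtain the $\cH_\phi/\cH^\dagger_{\phi,\cdot}$ decomposition are exactly the computations the paper leaves implicit, and your observation that part (c) for $p\in\{1,3,4\}$ relies on \Cref{assump:no-learning-x} rather than a genuine gradient vanishing is the correct reading of the statement's ``due to our assumptions of the distribution and update rule.''
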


\begin{definition}[\(\Gamma\)-notations]
    To simplify gradient expression, we define the following notation: let \(\psi = (j,r,\phi) \in \Psi\), we write
    \begin{align}\label{eqdef:Gamma-notation}
        &\Gamma^{+,(t)}_{j,r,\phi} = \Gamma^{+,(t)}_\psi := \E\Big[(1 - \logit_{5,j})\ReLU'(\Lambda_{5,j,r})\1_{\cH_\phi}\Big] \nonumber\\
        &\Gamma^{-,(t)}_{j,r,\phi} = \Gamma^{-,(t)}_\psi := \E\Big[\logit_{5,j}\ReLU'(\Lambda_{5,j,r})\1_{\cH^\dagger_\phi}\Big]
    \end{align}
    we can further define \(\Gamma\) for each \(g \in \cG\) and \(y\in\cY\): Let \(v \in \cG\cup\cY\), we define
    \begin{align}\label{eqdef:Gamma-notation-gy}
        &\Gamma^{+,(t)}_{j,r,v} = \Gamma^{+,(t)}_{\psi,v} := \E\Big[(1 - \logit_{5,j})\ReLU'(\Lambda_{5,j,r})\1_{\cH_v}\Big] \nonumber\\
        &\Gamma^{-,(t)}_{j,r,v} = \Gamma^{-,(t)}_{\psi, v} := \E\Big[\logit_{5,j}\ReLU'(\Lambda_{5,j,r})\1_{\cH^\dagger_v}\Big]
    \end{align}
\end{definition}

This allows us to define the following gradient condition:
\begin{definition}[gradient condition]\label{def:gradient-condition}
    At \(t \leq T_1\), letting \(\psi = (j,r,\phi) \in \Psi\), we write \(\cC_{\psi}(\delta)\) for \(\delta > 0\) to denote that the gradient update for \(V_{\psi}\) is smaller than \(\delta\), formally:
    \begin{align*}
        & \cC_{j,r,v}(\delta) \text{ holds } \implies |\Gamma_{j,r,v}^{+,(t)} - \Gamma_{j,r,v}^{-, (t)}|  \leq \delta \\
        & \cC_{j,r,v}^+(\delta)\text{ holds } \implies |\Gamma_{j,r,v}^{+,(t)}|  \leq \delta \\
        & \cC_{j,r,v}^-(\delta)\text{ holds } \implies |\Gamma_{j,r,v}^{-,(t)}|  \leq \delta
    \end{align*}
    We further write 
    \begin{align*}
        &\cC_{\psi}(\delta)\text{ holds } \implies \cC_{j,r,g}(\delta) \wedge \cC_{j,r,y}(\delta) \text{ holds } \\
        &\cC_{\psi}^+(\delta)\text{ holds } \implies \cC_{j,r,g}^+(\delta) \wedge \cC_{j,r,y}^+(\delta) \text{ holds } \\
        &\cC_{\psi}^{-}(\delta)\text{ holds } \implies \cC_{j,r,g}^{-}(\delta) \wedge \cC_{j,r,y}^{-}(\delta) \text{ holds } 
    \end{align*}
\end{definition}

Now we proceed to characterize the lower bound of logits
\begin{fact}[logit lower bound]\label{fact:logit-lower-bound}
    We list some basic facts about the architecture \cref{def:transformer-arch} here.
    \begin{enumerate}
        \item For any \(Z^1 \in \supp(\cD^1) \), we have 
        \[\lambda := \min_{i,j,F,Z}(1 - \logit_{i,j}(F,Z)) = \frac{d-1}{d-1 + e^B}\]
        \item Suppose for some \(i\in[5]\), \(F_{i,j} = o(1)\) for all \(j \notin \tau(\cY)\), then 
        \[\min_{i,j,F,Z}\logit_{i,j}(F,Z) = \frac{1}{(1+o(1))d + n_ye^B} = O(\frac{\lambda}{n_y d})\]
    \end{enumerate}
\end{fact}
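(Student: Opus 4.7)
The plan is to reduce both claims to elementary extremum computations on the softmax, which become trivial once one observes that the architecture pins every output coordinate $F_{i,j}(\Zb)$ inside $[0,B]$. The upper endpoint is immediate from \Cref{assumption:output-bound}, while the lower endpoint requires checking that the modified activation of \Cref{assump:modify-relu} is pointwise non-negative: a piecewise inspection gives $\mathbf{sReLU}(x) \geq 0$ on each branch ($\varpi B \geq 0$ for $x \leq -B$; $-\varpi x \geq \varpi\varrho > 0$ on $(-B,-\varrho]$; and the remaining three branches are manifestly non-negative), so $F_{i,j}(\Zb) = \sum_{r} \mathbf{sReLU}(\Lambda_{i,j,r}) \geq 0$ as a sum of non-negative terms.

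Given this pointwise range, for part~(1) the softmax $\logit_{i,j} = e^{F_{i,j}}/\sum_{j'}e^{F_{i,j'}}$ is maximized by the extremal configuration that places the target coordinate at the ceiling and the rest at the floor, i.e.\ $F_{i,j}=B$ and $F_{i,j'}=0$ for $j' \neq j$. Direct substitution gives $\max \logit_{i,j} = e^B/(e^B + d - 1)$, hence $\min(1-\logit_{i,j}) = (d-1)/(d-1+e^B) = \lambda$; both endpoints are attainable by suitable $F$ and $Z$, so equality (not just a bound) holds.

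For part~(2), the extra hypothesis $F_{i,j'} = o(1)$ on $j' \notin \tau(\cY)$ refines the denominator: the at most $n_y$ coordinates with $j' \in \tau(\cY)$ contribute at most $n_y e^B$, while the remaining $d - n_y$ contribute at most $(d-n_y)(1+o(1))$, so that $\sum_{j'}e^{F_{i,j'}} \leq (1+o(1))d + n_y e^B$. Coupling this with $e^{F_{i,j}} \geq 1$ and taking the minimizing configuration $F_{i,j} = 0$ yields the claimed $1/[(1+o(1))d + n_y e^B]$. To recast this as $O(\lambda/(n_y d))$ I would invoke the relevant regime $B = C\log d$ with $C > 5$ from \Cref{assump:modify-relu}: in this regime $e^B \gg d$, so $\lambda \asymp d/e^B$, and both $\lambda/(n_y d)$ and the target expression reduce to $\Theta(1/(n_y e^B))$, matching up to a constant.

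There is no real obstacle here; the one trap to avoid is silently assuming the FFN output could be negative, which would break the pinned lower endpoint of $[0,B]$ and make the extremum calculations non-tight. Once the piecewise non-negativity of $\mathbf{sReLU}$ and the one-sided clipping of \Cref{assumption:output-bound} are both in hand, each assertion collapses to a one-line softmax optimization.
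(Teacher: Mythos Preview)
The proposal is correct. The paper states this as a Fact without proof, and your approach---pinning each output coordinate in $[0,B]$ via the pointwise non-negativity of the activation and the output clip, then reading off the softmax extrema---is exactly the intended elementary computation. One minor typo: in your piecewise check, the second branch of the modified activation in \Cref{assump:modify-relu} runs over $(-B,-\varpi]$, not $(-B,-\varrho]$, so the lower bound there is $\varpi^2$ rather than $\varpi\varrho$; this does not affect the non-negativity conclusion.
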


\subsection{Phase I: Feature Emergence and Competition}

We shall prove the following properties at initialization.
\begin{fact}[initialization range]\label{fact:range-parameter-simply}
    At \(t = 0\), the following holds with probability \(\geq 1 - o(1)\):
    \begin{enumerate}[(a)]
        \item \(|\vbrack{\Wb_{5,j,r,p}^{(0)}, e_v}| \leq O(\sigma_0\sqrt{\log d})\) For all \(j,r,p\) and \(v \in \cV\);
        \item \(V_{\psi}^{(0)} \geq V_{\psi'}^{(0)} + \gamma\), where \(\gamma = \Omega(\frac{\sigma_0}{n_y^4m^2\log d})\), for any pair \(\psi \prec \psi' \in \Sigma^\star\) in Def.~\ref{def:learning-curriculum-simply};
    \end{enumerate}
\end{fact}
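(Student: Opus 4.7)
}
The statement is a high-probability claim about independent Gaussian initializations, so the plan is to combine (i) standard Gaussian concentration with a union bound for (a), and (ii) Gaussian anti-concentration with a union bound for (b). By Assumption~\ref{assump:init}, every weight $\Wb_{5,j,r,p}$ has i.i.d.\ $\cN(0,\sigma_0^{2})$ coordinates, and by Def.~\ref{def:token-embedding} the tokens $\{e_v : v \in \cV\setminus\{\blank\}\}$ form an orthonormal set in $\RR^d$. Hence each inner product $\langle \Wb_{5,j,r,p}^{(0)}, e_v\rangle$ is a centered Gaussian of variance $\sigma_0^{2}$, and distinct $(j,r,p)$ give independent such Gaussians.

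For part (a), the plan is to apply a one-dimensional Gaussian tail bound $\Pr[|X|\geq t\sigma_0] \leq 2e^{-t^{2}/2}$ with $t = C\sqrt{\log d}$ for a sufficiently large constant $C$, and then to union bound over the at most $5\cdot d \cdot m \cdot 5 \cdot d = O(\poly(d))$ pairs $(i,j,r,p,v)$ that are indexed in the statement. This delivers $|\langle \Wb_{5,j,r,p}^{(0)}, e_v\rangle|\leq O(\sigma_0\sqrt{\log d})$ uniformly with probability $\geq 1-d^{-\Omega(1)} = 1 - o(1)$.

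For part (b), first observe that for any $\psi=(j,r,(g,y))\in\Psi$, the feature at initialization is
\[
V_\psi^{(0)} \;=\; \tfrac{1}{2}\bigl(\langle \Wb_{5,j,r,2}^{(0)}, e_g\rangle + \langle \Wb_{5,j,r,5}^{(0)}, e_y\rangle\bigr),
\]
which is a centered Gaussian with variance $\sigma_0^{2}/2$. For any pair $\psi \neq \psi' \in \Psi$, the difference $V_\psi^{(0)} - V_{\psi'}^{(0)}$ is a nontrivial linear combination of independent Gaussian coordinates of the $\Wb^{(0)}$ block, with variance bounded below by $\Omega(\sigma_0^{2})$. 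Indeed, either the two features involve disjoint $(j,r)$ blocks (giving variance $\sigma_0^{2}$), or they share $(j,r)$ but differ in at least one of $g$ or $y$, in which case the corresponding orthonormal embedding vectors $e_{g}, e_{g'}$ or $e_y, e_{y'}$ are distinct and the difference has variance at least $\sigma_0^{2}/4$. Then I would apply the standard Gaussian anti-concentration bound $\Pr[|X|\leq \epsilon]\leq O(\epsilon/\sigma_0)$ with $\epsilon = \gamma$, and union bound over the at most $|\Psi|^{2} = O(n_y^{6}m^{2})$ ordered pairs drawn from $\Psi$. Setting $\gamma = \Theta\bigl(\sigma_0/(n_y^{4}m^{2}\log d)\bigr)$ makes the total failure probability $O(n_y^{2}/\log d) = o(1)$, which covers all pairs in the (deterministic) refinement $\Sigma^\star\subseteq\Psi$ produced by the selection process in Def.~\ref{def:learning-curriculum-simply}.

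The only subtle step is verifying the lower bound on the variance of $V_\psi^{(0)}-V_{\psi'}^{(0)}$ in the case where the pair shares the block $(j,r)$ and shares exactly one of $g$ or $y$, so that the shared coordinate cancels but the other survives; here orthonormality of $\{e_v\}$ is what guarantees non-degeneracy. Once this case check is in place, the union bound goes through and yields the claimed gap $\gamma=\Omega(\sigma_0/(n_y^{4}m^{2}\log d))$ uniformly over all $\psi \prec \psi' \in \Sigma^\star$, completing the proof.
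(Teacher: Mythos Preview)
Your approach matches the paper's: Gaussian tail plus union bound for (a), Gaussian anti-concentration plus union bound for (b). Your variance case-analysis for $V_\psi^{(0)}-V_{\psi'}^{(0)}$ is in fact more explicit than what the paper writes.

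There is one arithmetic slip in the final step of (b). You union bound over all $|\Psi|^2=O(m^2 n_y^6)$ pairs and assert that the resulting failure probability $O(n_y^2/\log d)$ is $o(1)$. But Assumption~\ref{assump:structure-1} allows $n_y$ as large as $\log d$, in which case $n_y^2/\log d=\log d\not\to 0$, so your bound does not close across the full parameter range. The paper instead union bounds over only $O(m^2 n_y^4)$ pairs---corresponding to $|\Psi^\star|^2$ rather than $|\Psi|^2$---which yields failure probability $O(1/\log d)$ and matches the stated $\gamma$. To justify this tighter count you would need the observation that distinct elements of $\Sigma^\star$ always occupy distinct $(j,r)$ blocks (by the exclusion rules in Def.~\ref{def:learning-curriculum-simply}, once $(j,r,\phi)$ is selected every $(j,r,\cdot)$ is removed), and hence that the curriculum effectively ranges over $\Psi^\star$; with your union over all of $\Psi$, the argument as written only delivers the weaker $\gamma=\Omega(\sigma_0/(n_y^6 m^2\log d))$.
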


\begin{proof}
    \cref{fact:range-parameter-simply}a can easily verified from our initialization \(\Wb_{i,j,r,p}^{(0)} \sim \cN(0, \sigma_0I_d)\). For \cref{fact:range-parameter-simply}b, we give a straightforward proof that every pair of \(V_{\psi}^{(0)}, \psi\in\Sigma^\star\) has a gap of \(\frac{\sigma_0}{n_y^4m^2 \log d}\). First note that \(V_{\psi}^{(0)}\) of different \(\psi \in \Sigma^\star\) are independent and identically distributed on the randomness of \(\Wb^{(0)}\), due to the orthogonality of embeddings \(e_v, v \in \cV\). Then, by the basic property of a Gaussian variable (notice that \(V_{\psi}^{(0)} - V_{\psi'}^{(0)}\) is also Gaussian with variance \(2\sigma_0\)) (all though different pairs could be dependent), we have with probability \(1 - \frac{1}{n_y^4m^2\log d}\) that their gap is at least \(\gtrsim \frac{\sigma_0}{n_y^4 m^2 \log d}\) for each pair. Then by a union bound over \( O(m^2 n_y^4)\)-many all possible pairs we can conclude the proof.
\end{proof}

We give another characterization of the initial activations.
\begin{fact}[activation magnitude]\label{fact:activation-magnitude-simply}
    At \(t = 0\), with high probability it holds that 
    \[\Lambda_{5,j,r}^{(0)} = (1 + O(\frac{1}{\sqrt{\log d}}))\mu = \Theta(\sigma \log d) \ll \varrho\]
    and thus \(\ReLU(\Lambda_{5,j,r}) = \Theta(\frac{1}{q}\mu^q)\), \(\ReLU'(\Lambda_{5,j,r}) = \Theta(\mu^{q-1})\) and \(F_{5,j} = \Theta(\frac{m}{q}\mu^q)\).
\end{fact}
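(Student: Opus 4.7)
The plan is a direct calculation: at $t=0$ the attention matrix is zero, so attention is uniform over the two clauses of $Z^{1,0}=(Z_{\pred,1},Z_{\ans,0})$, which means the FFN input is $\tfrac12 \Zb_{\pred,1}+\tfrac12 \Zb_{\ans,0}$. First I would expand
\[
\Lambda_{5,j,r}^{(0)}
\;=\;
\tfrac{1}{2}\!\!\sum_{p\in\{1,2,3\}}\!\!\vbrack{\Wb_{5,j,r,p}^{(0)},\,\Zb_{\pred,1,p}}
\;+\;\tfrac{1}{2}\!\!\sum_{p\in\{4,5\}}\!\!\vbrack{\Wb_{5,j,r,p}^{(0)},\,\Zb_{\ans,0,p}}
\;+\; b_{5,j,r},
\]
which is a sum of at most five inner products against unit-norm embeddings $e_v$, plus the fixed bias $b_{5,j,r}=\sigma_0\log d \eqqcolon \mu$. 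Since each $\Wb_{5,j,r,p}^{(0)}\sim\mathcal{N}(0,\sigma_0^2 I_d)$ and $e_v$ is a unit vector (Def.~3.3), each inner product is $\mathcal{N}(0,\sigma_0^2)$, so a standard sub-Gaussian tail bound gives $|\vbrack{\Wb_{5,j,r,p}^{(0)},e_v}|\le C\sigma_0\sqrt{\log d}$ with probability $\ge 1-d^{-10}$; a union bound over the $O(\poly(d))$ choices of $(j,r,p,v)$ ensures this holds simultaneously (and independently of the data $Z^{1,0}$).

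Given this, the aggregated random contribution is $O(\sigma_0\sqrt{\log d})=O(\mu/\sqrt{\log d})$, so
\[
\Lambda_{5,j,r}^{(0)} \;=\; \mu \;\pm\; O\!\Big(\tfrac{\mu}{\sqrt{\log d}}\Big)
\;=\; \bigl(1+O(1/\sqrt{\log d})\bigr)\mu \;=\; \Theta(\sigma_0\log d),
\]
which already gives the first displayed equality. To justify $\mu\ll \varrho$, I would just recall the parameter scaling $\sigma_0=d^{-1/2}$ and $\varrho=\Theta(1/\polylog d)$: then $\sigma_0\log d = \log d/\sqrt{d}$ is polynomially smaller than any $1/\polylog d$.

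With $\Lambda_{5,j,r}^{(0)}\in (0,\varrho]$ (in fact far inside), I would plug into the polynomial branch of the (modified) smooth ReLU in Assumption~F.2 (resp.\ Def.~3.4), yielding $\mathbf{sReLU}(\Lambda)=\Theta(\mu^q/q)$ and $\mathbf{sReLU}'(\Lambda)=\Theta(\mu^{q-1})$ up to the normalization $\varrho^{q-1}$ (which, with our convention of hiding $\varrho$-dependent constants, gives the claimed orders). Finally, $F_{5,j}^{(0)}=\sum_{r\in[m]}\mathbf{sReLU}(\Lambda_{5,j,r}^{(0)})$ is a sum of $m$ terms each of the same order, so $F_{5,j}^{(0)}=\Theta(m\mu^q/q)$. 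There is no genuine obstacle here; the only care needed is to ensure the high-probability event is uniform over all relevant indices, and to verify that the perturbation $O(\sigma_0\sqrt{\log d})$ around $\mu$ is small enough that we remain in the polynomial (not smoothed or linear) branch of $\mathbf{sReLU}$, both of which follow from the stated scalings on $\sigma_0,\varrho,q$.
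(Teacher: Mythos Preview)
Your proposal is correct and follows essentially the same approach as the paper: the paper's one-line proof simply cites the initialization bound $|\vbrack{\Wb_{5,j,r,p}^{(0)},e_v}|\le O(\sigma_0\sqrt{\log d})$ from Fact~B.1 together with the definitions of the architecture, initialization, and smooth ReLU, and you have just unpacked these references into the explicit calculation. Your handling of the $\varrho^{q-1}$ normalization in the $\mathbf{sReLU}$ branch is also appropriate given the paper's convention of absorbing such polylog factors into the $\Theta(\cdot)$.
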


\begin{proof}
    Combining \cref{def:smooth-relu,def:transformer-arch,assump:init,fact:range-parameter-simply} gives the fact.
\end{proof}

We establish some properties in Phase I.
\begin{lemma}[Key properties in Phase I]\label{lem:phase-I-simply}
    Let \(\psi = (j,r,(g,y)) \in \Sigma^\star\), and assume \cref{induction:mlp-simply-transitive,induction:individual-feature-simply} holds at \(t \leq T_{\psi}
    ^1\), then
    \begin{enumerate}[(a)]
        \item \(|V_{j,r}^{(t)}(v)| \leq O(\mu/\sqrt{\log d})\) if \(v\neq g, v\neq y\);
        \item \(\Lambda_{5,j,r}^{(t)}(\Zb) \geq 0\) for \(\Zb \in \cH_\phi\);
        \item \(\logit_{5,j} = O(1/d)\) whenver \(\cH_\phi^\dagger\) happens.
    \end{enumerate}
\end{lemma}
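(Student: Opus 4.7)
The plan is to establish the three claims in order, each relying on the phase-I induction hypotheses in \cref{induction:mlp-simply-transitive,induction:individual-feature-simply} together with the initialization bounds in \cref{fact:range-parameter-simply,fact:activation-magnitude-simply}.

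For (a), I would track the updates of $V_{j,r}(v)$ for $v\neq g,y$ via the gradient formulas \eqref{eq:grad-computation-g-simply} and \eqref{eq:grad-computation-y-simply} in \cref{fact:grad-computation-simply}. By definition of Phase~I we have $V_\psi^{(t)}\leq d^{c_1}\mu\ll\varrho$, and by Ind.~Hyp.~\cref{induction:individual-feature-simply}(a) every other $V_{j,r}(\cdot)\leq \tO(\mu)$, so all pre-activations $\Lambda_{5,j,r}$ lie in the polynomial regime of $\mathbf{sReLU}$ and $\mathbf{sReLU}'(\Lambda)\leq \tO((d^{c_1}\mu)^{q-1}/\varrho^{q-1})$. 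An off-feature update occurs only on $\cH_{v}$ or $\cH^\dagger_{v}$ events, while the target $V_\psi$ is advanced on $\cH_\phi$ (probability $1/(|\cG||\cY|)$); matching these rates against the tensor-power-method self-exponentiation of $V_\psi$, one shows the cumulative off-feature increment over $[0,T_\psi^1]$ stays strictly below the initialization envelope $O(\sigma_0\sqrt{\log d})=O(\mu/\sqrt{\log d})$ provided by \cref{fact:range-parameter-simply}.

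For (b), under $\cH_\phi$ we have $g_1=g$ and $y_0=y$, so by \cref{assump:no-learning-x} (freezing $\Wb_{5,j,r,p}$ for $p\in\{1,3,4\}$) and \cref{fact:range-parameter-simply},
\[
\Lambda_{5,j,r}^{(t)}(\Zb)=V_\psi^{(t)}+b_{5,j,r}\pm O\!\bigl(\sigma_0\sqrt{\log d}\bigr)=V_\psi^{(t)}+\mu\pm O(\mu/\sqrt{\log d}).
\]
Combining with $V_\psi^{(t)}+b_{5,j,r}\geq \Omega(\mu)$ from Ind.~Hyp.~\cref{induction:mlp-simply-transitive}(a) yields $\Lambda\geq 0$. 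For (c), under $\cH^\dagger_\phi$ the actual input pair $(g_1,y_0)$ differs from $(g,y)$ in at least one coordinate, so at least one of $V_{j,r}(g_1),V_{j,r}(y_0)$ is an ``off'' correlation controlled by part (a). Hence
\[
\Lambda_{5,j,r}^{(t)}\leq \tfrac12 V_\psi^{(t)}+\mu+O(\mu/\sqrt{\log d})=\tO(d^{c_1}\mu),
\]
giving $F_{5,j}\leq m\cdot\tO(d^{c_1}\mu)=o(1)$. Either a predecessor feature $\psi'\prec_\Sigma\psi$ has already reached shape $\cF_{\psi'}$ by Ind.~Hyp.~\cref{induction:individual-feature-simply}(c), making $F_{5,j^*}\geq B-O(\delta_1)$ dominate the softmax denominator, or we are in the earliest sub-phase where no feature has emerged and \cref{fact:logit-lower-bound} gives essentially uniform logits $\Theta(1/d)$. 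In either case $\logit_{5,j}=O(1/d)$.

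The main obstacle is the tightness of (a): Ind.~Hyp.~\cref{induction:individual-feature-simply}(a) supplies only $\tO(\mu)$, but the conclusion needs $O(\mu/\sqrt{\log d})$, which requires exploiting the rate separation between the self-exponentiating target $V_\psi$ and the purely linear accumulation of off-features. Concretely, the TPM estimate $V_\psi^{(t+1)}-V_\psi^{(t)}\gtrsim \eta(V_\psi^{(t)}+\mu)^{q-1}$ implies $T_\psi^1$ is dominated by the final doubling, which is short enough that off-features — whose gradients scale with the same $\mathbf{sReLU}'(\Lambda)$ but lack self-feedback — stay within their initialization band. A secondary subtlety in (c) is bridging the very-early sub-phase (uniform logits) with the later sub-phase (dominated by predecessor features); both regimes produce the same $O(1/d)$ bound, but the argument must partition Phase~I accordingly.
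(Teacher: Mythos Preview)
Your approach is essentially the paper's (which merely cites the initialization fact and the two inductions), and your observation that part~(a) requires closing the gap between the induction's $\tO(\mu)$ bound and the claimed $O(\mu/\sqrt{\log d})$ is correct; the TPM-rate comparison you sketch is the right way to fill what the paper glosses over.

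There is a mis-framing in your argument for (c). The mere existence of a predecessor $\psi'\prec_\Sigma\psi$ does not force $F_{5,j^*}\geq B-O(\delta_1)$; that holds only when the current input $(g_1,y_0)$ coincides with the predecessor's combination $\phi'$, which is not guaranteed under $\cH_\phi^\dagger$. The cleaner route avoids the denominator case-split entirely: since $\mathbf{sReLU}\geq 0$ everywhere, the softmax denominator is always $\geq d$, so you only need $F_{5,j}=o(1)$. For this you must handle not just the current neuron $r$ but also any predecessor neurons $(j,r',\phi')$ in the \emph{same} class $j$: by \cref{induction:mlp-simply-transitive}(b) these have already reached feature shape $\cF_{\psi'}$, and since $(g_1,y_0)\neq\phi'$ under $\cH_\phi^\dagger$ (simply transitive action forces $\phi'$ to differ from $\phi$ in both coordinates, hence from $(g_1,y_0)$ in at least one), their pre-activations are at most $O(\delta_2)\ll\varrho$ and contribute $o(1)$ after $\mathbf{sReLU}$. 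Your bound $F_{5,j}\leq m\cdot\tO(d^{c_1}\mu)$ implicitly assumes all $m$ neurons are at the Phase-I scale, which is false for these predecessors; the conclusion survives, but the justification needs this extra piece.
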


\begin{proof}
    \cref{lem:phase-I-simply}a is from both the initialization of weights \cref{fact:range-parameter-simply} and \cref{induction:individual-feature-simply}a. \cref{lem:phase-I-simply}b is from \cref{lem:phase-I-simply}a and \cref{induction:mlp-simply-transitive}a. \cref{lem:phase-I-simply}c is due to both \cref{induction:mlp-simply-transitive}b and \cref{induction:mlp-simply-transitive}b. 
\end{proof}

\subsubsection{Competition between feature combinations.}

In this phase we compare the growth of different features using a proxy.

\begin{lemma}[Approximating gradient with proxy]\label{lem:grad-proxy-simply}
    Assume \cref{induction:mlp-simply-transitive}. For a fixed \(\psi=(j,r,\phi)\in\Sigma^{\star}\), define the auxiliary sequence
    \[
        \wt{V}_{\psi}^{(0)} := V_{\psi}^{(0)},\qquad
        \wt{V}_{\psi}^{(t+1)} := \wt{V}_{\psi}^{(t)} + \eta\,\E\Big[\ReLU'(\wt{\Lambda}_{5,j,r}^{(t)})\,\1_{\cH_\phi}\Big],
    \]
    where in \(\wt{\Lambda}_{5,j,r}^{(t)}\) the weight \(V_{\psi}\) is replaced by \(\wt{V}_{\psi}\) (all other weights are unchanged). Then for any feature \(\psi \succ \psi_i\) or \(\psi\in\Sigma_i\setminus\{\psi_i\}\) and all \(t\le T^{1}_{\psi}\), \(|V_{\psi}^{(t)}-\wt{V}_{\psi}^{(t)}\big|\ \le\ O(\wt{V}_{\psi}^{(t)} / d^{1/2 - 3c_1})\).
\end{lemma}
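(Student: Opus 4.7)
The idea is to compare, step by step, the true update to $V_\psi^{(t)}$ with the proxy update to $\wt V_\psi^{(t)}$, and to show that during Phase I the per-step discrepancies are small enough that the accumulated gap stays at the claimed relative level $O(d^{-(1/2-3c_1)})$. First I would expand
\[
V_\psi^{(t+1)} - V_\psi^{(t)} \;=\; \tfrac12 \eta\big(\vbrack{-\nabla_{\Wb_{5,j,r,2}}\Loss, e_g} + \vbrack{-\nabla_{\Wb_{5,j,r,5}}\Loss, e_y}\big)
\]
using Fact~\ref{fact:grad-computation-simply} and rewrite both gradients via the $\Gamma^\pm$ notation of \eqref{eqdef:Gamma-notation}. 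Relative to the proxy step $\eta\,\E[\ReLU'(\wt\Lambda_{5,j,r}^{(t)})\1_{\cH_\phi}]$, there are three sources of discrepancy to control.

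I would handle them in the following order. (i) The factor $(1-\logit_{5,j})$ inside $\Gamma^+_\psi$ is not exactly $1$: by Induction~\ref{induction:individual-feature-simply}(b) and Fact~\ref{fact:logit-lower-bound}, conditioned on $\cH_\phi$ the logit exceeds $1/\sqrt d$ on at most $\tO(\sqrt d/\eta)$ iterations, contributing an average multiplicative slack of order $1/\sqrt d$. (ii) The negative part $\Gamma^-_\psi$ is negligible: whenever $\cH_\phi^\dagger$ occurs, Lemma~\ref{lem:phase-I-simply}(c) gives $\logit_{5,j}=O(1/d)$, while Fact~\ref{fact:activation-magnitude-simply} yields $\ReLU'(\Lambda)=O(\mu^{q-1})$ at initialization, so $\Gamma^-_\psi \le O(1/d)\,\Gamma^+_\psi$. (iii) The pre-activation $\Lambda_{5,j,r}^{(t)}$ differs from the proxy $\wt\Lambda_{5,j,r}^{(t)}$ only through the other weights $V_{j,r}(v')$ and through features $\psi'\in\Sigma_i$: by Lemma~\ref{lem:phase-I-simply}(a) and Induction~\ref{induction:individual-feature-simply}(a),(d) all such contributions stay within $\tO(\mu/\sqrt{\log d})$ during Phase~I, so $|\Lambda-\wt\Lambda|\le \tO(\mu/\sqrt{\log d})$; combined with the polynomial smoothness of $\ReLU'$ in the regime $\Lambda\in(0,\varrho]$, where $\ReLU'(x)\propto x^{q-1}/\varrho^{q-1}$, this gives $|\ReLU'(\Lambda)-\ReLU'(\wt\Lambda)| \le (\tO(1/\sqrt{\log d}))\,\ReLU'(\wt\Lambda)$.

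Combining (i)--(iii) yields the per-step bound
\[
\big|\Delta V_\psi^{(t)} - \Delta \wt V_\psi^{(t)}\big| \;\le\; \tfrac{C\polylog(d)}{\sqrt d}\,\Delta \wt V_\psi^{(t)} \;\le\; \frac{1}{d^{1/2-2c_1}}\,\Delta \wt V_\psi^{(t)},
\]
with the $\polylog(d)$ absorbed into $d^{c_1}$. Summing over $t\le T_\psi^1$ and leveraging the fact that $\wt V_\psi^{(t)}$ grows monotonically in Phase~I via a tensor-power-method recursion, a Gronwall-type argument upgrades the per-step control to the relative bound $|V_\psi^{(t)}-\wt V_\psi^{(t)}| \le O(\wt V_\psi^{(t)}/d^{1/2-3c_1})$, where the final factor $d^{c_1}$ of slack absorbs the $\tO(\sqrt d/\eta)$ bad iterations from (i).

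\textbf{Main obstacle.} The genuinely delicate piece is (iii): since $\ReLU'$ is homogeneous of degree $q-1$ in the smooth regime, a naive summation of perturbations of $\Lambda$ would translate into multiplicative errors of $\ReLU'$ that compound across the $\Omega(\eta^{-1}\mu^{-(q-2)})$ iterations of Phase~I and overwhelm the $1/\sqrt d$ slack. The rescue is that $\wt V_\psi^{(t)}$ itself scales like $(\wt V_\psi^{(t)} + b)^{q-1}$ per step, so the perturbation can be phrased as a bound on the ratio $|V_\psi^{(t)}-\wt V_\psi^{(t)}|/\wt V_\psi^{(t)}$ rather than on the absolute difference. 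Formalizing this ratio-bound by induction on $t$ (using Induction~\ref{induction:individual-feature-simply}(a),(d) to re-control the perturbations of other features each step, and using the monotonicity of $\wt V_\psi^{(t)}$ to re-absorb the accumulated slack) is where the main work of the lemma lies.
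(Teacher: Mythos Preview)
Your handling of (i) and (ii) matches the paper's terms B and C and is fine. The gap is in (iii). By the definition in the lemma statement, $\wt\Lambda_{5,j,r}^{(t)}$ differs from $\Lambda_{5,j,r}^{(t)}$ \emph{only} in that $V_\psi$ is replaced by $\wt V_\psi$; all other weights are identical in both. Hence, conditioned on $\cH_\phi$, $\Lambda-\wt\Lambda = V_\psi - \wt V_\psi$ exactly. The ``other features'' bounds from Lemma~\ref{lem:phase-I-simply}(a) and Induction~\ref{induction:individual-feature-simply}(a),(d) are irrelevant to this difference. So the per-step multiplicative error from (iii) is not the fixed $\tO(1/\sqrt{\log d})$ you claim, but rather $|V_\psi^{(t)}-\wt V_\psi^{(t)}|/\wt\Lambda^{(t)}$, i.e.\ it is the very gap you are trying to bound. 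This also makes your stated per-step inequality $|\Delta V_\psi-\Delta\wt V_\psi|\le d^{-(1/2-2c_1)}\Delta\wt V_\psi$ impossible to derive: even your own (iii) as written gives $\tO(1/\sqrt{\log d})$, which is nowhere near $\polylog(d)/\sqrt d$.

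The paper's resolution is to separate the errors additively rather than all multiplicatively. Terms B and C accumulate over $t\le T_\psi^1$ to an absolute error of $O(\mu/d^{1/2-c_1})$, which is then treated as an effective initial gap $\delta_0$. Term A gives the self-referential recursion $\delta_{t+1}\le \delta_t + O(\eta\,\delta_t)\,\E[\wt\Lambda^{-1}\ReLU'(\wt\Lambda)\1_{\cH_\phi}]$. This is handled not by a naive Gronwall (which over $\Omega(\eta^{-1}\mu^{-(q-2)})$ steps would blow up) but by a geometric slicing at times $t_i=\min\{t:\wt V_\psi^{(t)}\ge(1+\delta')^i\mu\}$, yielding $\log(\delta_t/\delta_0)\le 2\log(\wt V_\psi^{(t)}/\mu)$. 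Since $\wt V_\psi^{(T_\psi^1)}\approx d^{c_1}\mu$, the multiplicative growth factor is $d^{2c_1}$, and $\delta_t\lesssim d^{2c_1}\cdot\mu/d^{1/2-c_1}=\mu/d^{1/2-3c_1}$. Your ``Main obstacle'' paragraph gestures at the right recursion, but the explicit plan in (iii) and the per-step bound you write down contradict it; you need to replace (iii) entirely with this self-referential treatment.
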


\begin{proof}
    We need to prove the result by induction. Firstly we notice that the newly defined proxy sequence \(\wt{V}_{\psi}^{(t)}\) is monotonically increasing. Moreover, as \(\wt{V}_{\psi}^{(t)}\) is initialized the same as \(V_\psi^{(t)}\), we can use \cref{lem:tpm-in-expectation,lem:bernstein-u-stats} to argue its growth by taking 
    \begin{align*}
        x_t = \wt{V}_{\psi}^{(t)} + b_{i,j,r},\quad \xi = \sum_{p = 3,4}\vbrack{\Wb_{5,j,r,p}, e_{x_0}} + \vbrack{\Wb_{5,j,r,1}, e_{x_1}},\quad x_{t+1} \geq x_t + \eta \frac{1}{n_y^2}\E[(x_t + \mu)^{q-1}]
    \end{align*}
    And \cref{lem:tpm-in-expectation,lem:TPM} guarantee \(\wt{V}_{\psi}^{(t)} \geq d^{c_1}\mu\) after \(\tO(\frac{1}{\eta \mu^{q-2}})\) steps.
    Now assume the bound of difference between \(V_{\psi}^{(t)}\) and \(\wt{V}_{\psi}^{(t)}\) holds at a \(t \leq T_{\psi}^1\).
    Since \(t\le T^1_{\psi}\), we have \(V_{\psi}^{(t)}\le d^{c_1}\sigma_0\) by definition of \(T^1_{\psi}\). By \cref{fact:grad-computation-simply}, the exact update of \(V_{\psi}\) is
    \begin{align*}
        V_{\psi}^{(t+1)}
        &= V_{\psi}^{(t)} + \eta \Big(\E[(1-\logit^{(t)}_{5,j})\ReLU'(\Lambda^{(t)}_{5,j,r})\1_{\cH_\phi}] - \tfrac{1}{2}\,\E[\logit^{(t)}_{5,j}\ReLU'(\Lambda^{(t)}_{5,j,r})\1_{\cH^{\dagger}_{\phi}}]\Big).
    \end{align*}
    Compare the above with the definition of \(\wt{V}_\psi^{(t)} \), we first we notice that \(\wt{V}_\psi^{(t)} \) is dominating \(V_\psi^{(t)}\) for all iterations \(t\leq T_{\psi}^1\). Then, we can bound the difference between the two sequences by
    \begin{align*}
        &\big|\wt{V}_{\psi}^{(t+1)}-V_{\psi}^{(t+1)}\big| \\
        \leq \ & \big|\wt{V}_{\psi}^{(t)}-V_{\psi}^{(t)}\big| + \eta \Big|\E\big[\ReLU'(\wt{\Lambda}^{(t)}_{5,j,r})-\ReLU'(\Lambda^{(t)}_{5,j,r})\1_{\cH_\phi}\big]\Big| \\
        &\quad + \eta \Big|\E\big[\logit^{(t)}_{5,j}\ReLU'(\Lambda^{(t)}_{5,j,r})\1_{\cH_\phi}\big]\Big| + \frac{\eta}{2}\Big|\E\big[\logit^{(t)}_{5,j} \ReLU'(\Lambda^{(t)}_{5,j,r})\1_{\cH^{\dagger}_{\phi}}\big]\Big|
    \end{align*}
    We bound the three error terms for \(t\le T^1_{\psi}\):
    \begin{itemize}
        \item[A.] Activation perturbation: by the smoothness of \(\ReLU'\) on \([0,\varrho]\) and that both pre-activations are in this range during Phase I, we have
        \[
            \Big|\E[(\ReLU'(\wt{\Lambda}_{5,j,r}^{(t)})-\ReLU'(\Lambda_{5,j,r}^{(t)}))\1_{\cH_\phi}]\Big| \leq \,|\wt{V}_{\psi}^{(t)}-V_{\psi}^{(t)}|\E[(\tilde{\Lambda}_{5,j,r}^{(t)})^{q-2}\1_{\cH_\phi}]
        \]
        where we have used the fact that \(\wt{\Lambda}_{5,j,r} \geq \Lambda_{5,j,r}\) because of the dominance of \(\wt{V}_\psi\).
        
        \item[B.] Outside a set of iterations \(\cA_\phi = \{t \leq T_{\psi}^1\mid \logit_{5,j}^{(t)} \geq \frac{1}{\sqrt{d}} \text{ conditioned on } \cH_\phi\}\)
        whose cardinality is no more than \(\tO(d^{\frac12+c_1}/\eta)\) by \cref{induction:individual-feature-simply}a, we have \(\logit^{(t)}_{5,j}\le d^{-1/2}\). Hence for \(t\notin\cA_\phi\)
        \[
            \Big|\E[\logit^{(t)}_{5,j}\ReLU'(\Lambda^{(t)}_{5,j,r})\1_{\cH_\phi}]\Big|\ \le\ O(d^{-1/2}) \E[\ReLU'(\wt\Lambda^{(t)}_{5,j,r})\1_{\cH_\phi}]\
        \]
        Now we can sum over the iterations to get 
        \begin{align*}
            &\left(\sum_{s\leq t, s \in \cA_\phi} +  \sum_{s\leq t, s \notin \cA_\phi} \right)\eta \Big|\E\big[\logit^{(s)}_{5,j}\ReLU'(\Lambda^{(s)}_{5,j,r})\1_{\cH_\phi}\big]\Big| \\
            \leq \ & O(\frac{d^{\frac{1}{2} + c_1}}{\eta}) \cdot \eta (d^{c_1}\mu)^{q-1}  + O(d^{-1/2})(\wt{V}_\psi^{(t)} - \wt{V}_\psi^{(0)}) \\
            \leq \ & O(\mu / d^{1/2 - c_1})
        \end{align*}
        \item[C.] Confusing events: before \(V_\psi^{(t)}\geq \frac{1}{2}\log d\), we have \(\logit_{5,j} = O(1/d)\) in Phase I by \cref{lem:phase-I-simply}. So we get
        \[
            \sum_{s\leq t}\eta\E[\logit^{(t)}_{5,j}\,|\ReLU'(\Lambda^{(t)}_{5,j,r})|\1_{\cH^{\dagger}_{\phi}}]\ \le\ O(\frac{1}{\eta \mu^{q-2}})\eta \Big(\frac{(d^{c_1}\mu)^{q-1}}{d}\Big) \ll O(\mu/ \sqrt{d})
        \]
    \end{itemize}
    So we have that the accumulated errors from the last two terms are smaller than \(O(\mu / d^{1/2 - c_1})\). This allow us to bound the difference by using the more naive approach: we can assume the difference started with \(O(\mu / d^{1/2 - c_1})\) and now we only need to bound the following sequence for \(t\leq T_{\psi}^1\)
    \begin{align*}
        \delta_{t+1} = \delta_t + O(\eta\delta_t)\E\bigg[\frac{1}{\wt\Lambda_{5,j,r}^{(t)}}\ReLU'(\wt\Lambda_{5,j,r}^{(t)})\1_{\cH_\phi}\bigg] ,\quad \delta_0 \leq O(\mu / d^{1/2 - c_1})
    \end{align*}
    Now let \(\delta' > \eta\) be some small parameter, we can do some slicing of time \(t_0, t_1, \dots, t_n\) where \(t_i = \min\{t\geq 0 \mid \wt{V}_\psi^{(t)} \geq (1 + \delta')^i\mu \}\) and \((1 + \delta')^n\mu \leq \wt{V}_\psi^{(t+1)} \leq (1 + \delta')^{n+1}\mu\). This produce the following bound:
    \begin{align*}
        & \log\delta_{t+1} - \log\delta_0 \\
        = \ & \sum_{1\leq i \leq n}\sum_{s\in[t_i,t_{i-1}]} \log \left(1 + O(\eta)\E\bigg[\frac{1}{\wt\Lambda_{5,j,r}^{(s)}}\ReLU'(\wt\Lambda_{5,j,r}^{(s)})\1_{\cH_\phi}\bigg] \right) \\
        \leq \ & \sum_{1\leq i \leq n}\sum_{s\in[t_i,t_{i-1}]} O(\eta)\E\bigg[\frac{1}{\wt\Lambda_{5,j,r}^{(s)}}\ReLU'(\wt\Lambda_{5,j,r}^{(s)})\1_{\cH_\phi}\bigg] \tag{\(\log(1 + x) \leq x\) when \( x> 0\)}\\
        \leq \ &  \sum_{1\leq i \leq n} O\bigg(\frac{1}{\min_{s\in [t_i, t_{i-1}], \Zb\in \cH_\phi} \tilde{\Lambda}_{5,j,r}^{(s)}(\Zb)}\bigg) \sum_{s\in[t_i,t_{i-1}]}\eta\Big|\E[\ReLU'(\tilde{\Lambda}_{5,j,r}^{(s)})\1_{\cH_\phi}]\Big| \\
        \leq \ & \sum_{1\leq i \leq n} O\bigg(\frac{1}{(1 + \delta')^i \mu}\bigg)(\wt{V}_{\psi}^{(t_{i+1})} - \wt{V}_{\psi}^{(t_{i})}) \\
        \leq \ & \frac{\delta'\log(\wt{V}_\psi^{(t)}/\mu)}{\log(1 + \delta')} \\
        \leq \ & 2\log(\wt{V}_\psi^{(t)}/\mu) \tag{\(\log(1 + \delta')^{\frac{1}{\delta'}} < 2\) if \(\delta'\) is small enough}
    \end{align*}
    So when \(\wt{V}_\psi^{(t)} \in [d^{c_1}\mu, 2d^{c_1}\mu]\), it holds that \(\delta_t \lesssim d^{2c_1}\delta_0 \lesssim \mu/d^{1/2-3c_1}\). This proves the claim.
\end{proof}

\begin{lemma}[competition]\label{lem:competition-simply}
    Assuming \cref{induction:mlp-simply-transitive}, and let \(\psi \in \Sigma^\star\), we shall have for any feature \(\psi' \succ \psi \in \Sigma^\star\) or \(\psi' \in \Sigma_\psi \setminus \{\psi\}\) that \(V_{\psi'}^{(T_{\psi}^1)} \leq \tO(\sigma_0) \).
\end{lemma}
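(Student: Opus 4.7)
}
The plan is to combine the proxy approximation of Lemma~\ref{lem:grad-proxy-simply} with a tensor-power-method (TPM) competition argument, leveraging the initial separation from Fact~\ref{fact:range-parameter-simply}(b). The key insight is that both $V_\psi$ and $V_{\psi'}$ obey TPM-style dynamics $\dot V \propto (V + O(\mu))^{q-1}/n_y^2$; even the polylogarithmically small initial gap $\gamma = \Omega(\sigma_0/(n_y^4 m^2 \log d))$ gets amplified by the $(q-1)$-degree nonlinearity, so that when the leader $V_\psi$ first crosses the threshold $d^{c_1}\mu$ at time $T_\psi^1$, the laggard $V_{\psi'}$ is still at $\tO(\sigma_0)$.

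First, I invoke Lemma~\ref{lem:grad-proxy-simply}: for any $\psi' \succ \psi$ or $\psi' \in \Sigma_\psi\setminus\{\psi\}$, we have $V_{\psi'}^{(t)} = (1 + o(1))\,\tilde V_{\psi'}^{(t)}$ throughout $t \leq T_\psi^1$, so it suffices to control the proxy $\tilde V_{\psi'}$. Next, I carry out the TPM competition in two regimes. In the \emph{slow regime} where both proxies are at most $\mu$, the pre-activation $\tilde\Lambda$ is dominated by the bias $b = \mu$, so the per-step update is approximately constant at $\Theta(\eta\mu^{q-1}/n_y^2)$; the additive gap $\gamma$ is preserved and both trajectories enter the fast regime nearly simultaneously with relative gap $\delta := \gamma/\mu = 1/\polylog(d)$. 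In the \emph{fast regime} the continuous approximation $\dot x = x^{q-1}/n_y^2$ applies; starting from $x_0 = \mu$ and $y_0 = \mu(1-\delta)$, solving
\[
y_T^{\,2-q} \;=\; \mu^{2-q}(1-\delta)^{2-q} \;-\; \mu^{2-q} \;+\; (d^{c_1}\mu)^{2-q} \;\approx\; \mu^{2-q}(q-2)\,\delta
\]
at the time $T$ when $x_T = d^{c_1}\mu$ yields $y_T \lesssim \mu\cdot\polylog(d)^{1/(q-2)} = \tO(\mu) = \tO(\sigma_0)$, since $q = O(1)$.

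For features in the exclusion sets $\Sigma^{\dagger}_\psi$ (sharing a neuron or a combination with $\psi$), the above TPM argument controls the ``independent'' part of $V_{\psi'}$. For the shared-weight cases, particularly $\Sigma^{\dagger,1}_\psi$ whose members share the weight $V_{j,r}(g)$ or $V_{j,r}(y)$ with $V_\psi$, I additionally appeal to the per-coordinate bound in Induction~\ref{induction:individual-feature-simply}(a): for any $g' \neq g$ or $y' \neq y$, $V_{j,r}(g'), V_{j,r}(y') \leq \tO(\mu)$ throughout Phase~I. Combined with the uniform bound $\logit_{5,j} = O(1/d)$ from Lemma~\ref{lem:phase-I-simply}(c), this prevents the confounding events from inflating $V_{\psi'}$ beyond $\tO(\sigma_0)$.

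\emph{Main obstacle.} The most delicate step is the TPM separation in the fast regime: converting the $1/\polylog(d)$ relative initial gap into a $\polylog(d)$-factor suppression of the laggard at $T_\psi^1$ requires tight continuous-to-discrete comparisons and relies on the growth being genuinely blow-up-like once $V > \mu$, which in turn depends on the precise interplay between $\varrho$, $q$, and the threshold $d^{c_1}\mu$. A secondary difficulty is the shared-weight coupling in $\Sigma^{\dagger,1}_\psi$: the proxy does not directly capture the growth inherited through a component that is simultaneously being driven by $V_\psi$'s own learning, so one must explicitly track the sign structure of the confounding gradient contributions, a theme that will reappear in Phase~II when these features must eventually be cancelled to $-B$.
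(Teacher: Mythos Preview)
Your proposal is correct and follows essentially the same approach as the paper. Both arguments (i) reduce to the proxy $\tilde V_{\psi'}$ via Lemma~\ref{lem:grad-proxy-simply}, (ii) use the initial gap $\gamma$ from Fact~\ref{fact:range-parameter-simply}(b), and (iii) run a TPM competition to conclude the laggard stays at $\tO(\sigma_0)$; the only difference is that the paper invokes the black-box Lemma~\ref{lem:tpm-in-expectation} (which itself reduces to Corollary~\ref{coro:TPM}) for step (iii), whereas you unfold the TPM comparison by hand via the slow/fast regime split and the continuous blow-up estimate $y_T^{\,2-q} \approx \mu^{2-q}(q-2)\delta$. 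Your explicit computation is sound and arrives at the same $\tO(\mu)=\tO(\sigma_0)$ bound. One caution: your appeal to Induction~\ref{induction:individual-feature-simply}(a) for the shared-weight case $\Sigma^{\dagger,1}_\psi$ is formally circular (that induction item is itself established via this lemma), but this is the standard ``assume at $t$, prove at $t{+}1$'' pattern the paper also uses, so it is not a genuine gap.
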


\begin{proof}
    We can approximate the sequence \(V_{\psi_i}\) using \(\wt{V}_{\psi_i}^{(t)}\) and straightforwardly use \cref{lem:tpm-in-expectation} to compare \(\psi_i\) with any \(\psi' \succ \psi_i \in \Sigma^\star\). In fact, let's compared the time they reach \(V_\psi^{(t)} \geq d^{c_1}\sigma_0\). By \cref{lem:grad-proxy-simply}, we can just compare \(\wt{V}_{\psi}^{(t)}\) before \(t = T_{\psi}^1\), now since by \cref{fact:range-parameter-simply} we know \(\wt{V}_{\psi}^{(0)} \geq \wt{V}_{\psi'}^{(0)} + \gamma\), we can use \cref{lem:tpm-in-expectation} to get that the first time when \(\wt{V}_{\psi_i}^{(t)} \geq d^{c_1}\sigma_0\), we have \(\wt{V}_{\psi'}^{(t)} = \tO(\sigma_0)\). This combined with \cref{lem:grad-proxy-simply} concludes the result. Moreover, we can also obtain bounds on all \(\psi \in \Psi\) by discussing different \(\psi\):
    \begin{itemize}
        \item For all \(\phi' \in \Phi^\star\) such that there exists \(\psi' = (j',r',\phi') \prec \psi_i\), \(\phi'\) is learned. That means any \(\psi\) contains \(\phi'\)) has gradient too small because of \ref{induction:individual-feature-simply}c and therefore \(\E[\cE_{5,j}\1_{\cH_{\phi'}}] \leq \tO(\lambda |\Gamma_{\psi}^{(t)}|)\) and therefore wouldn't be learned.
        \item All the rest of \(\phi' \in \Phi^\star\) has their corresponding \(\psi' \succ \psi \in \Sigma^\star\) and therefore lost in the competition and has small growth.
        \item All other \(\psi \in \Psi \setminus \cup_{i'\leq i}\Sigma_{i'}\) lost the competition as well for the same reason. 
    \end{itemize}
    This concludes the proof.
\end{proof}

\begin{proof}[Proof of \Cref{induction:mlp-simply-transitive,induction:individual-feature-simply} in Phase I]
    Let \(\psi_i \in \Sigma^\star\). Assume \cref{induction:mlp-simply-transitive} holds for some \(t \leq T_{\psi_{i-1}}^3\), then we shall prove that it continues to hold for \(t \in [T_{\psi_{i-1}}^3, T_{\psi_i}^1]\).
    \begin{itemize}
        \item \cref{induction:mlp-simply-transitive}a: This is simple as all \(V_\psi\) for \(\psi \succ \psi_{i}\) has updates approximated by \(\wt{V}_\psi^{(t)}\), which is monotonically increasing, and has minimal update by \cref{lem:grad-proxy-simply}.
        \item \cref{induction:mlp-simply-transitive}b: We proved in \cref{lem:competition-simply} that whenever \(\psi \prec \psi' \in \Sigma^\star\), \(T_{\psi}^1 \leq T_{\psi'}^1\). This does not violate \cref{induction:mlp-simply-transitive}b so the non-overlapping is preserved.
        \item \cref{induction:individual-feature-simply}a: This is proven in \cref{lem:competition-simply}.
        \item \cref{induction:individual-feature-simply}b-c: They will be proven when we analyze the end phase of feature learning. They are the corollaries of the end phase guarantee of the previous features.
        \item \cref{induction:individual-feature-simply}c: 
        \item \cref{induction:mlp-simply-transitive}e: We did not violate \cref{induction:mlp-simply-transitive}e either.
        \item \cref{induction:individual-feature-simply}a-d will be proved in Phase III where we argue the end phase of each feature learned.
    \end{itemize}
\end{proof}

\subsection{Phase II: Feature Growth and Cancellations}

Below we present the induction hypothesis for phase II.
\begin{induction}[Phase II]\label{induction:phase-II-simply}
    For all \(\psi = (j,r,\phi) \in \Sigma^\star\) and \(t \in [T_{\psi}^1, T_{\psi}^{2}]\), the following holds
    \begin{enumerate}[(a)]
        \item For \(t \in [T_{\psi}^1, T_{\psi}^{2,1}]\), \(\cE_{5,j}^{(t)} \geq 1 - \frac{1}{\sqrt{d}}\) conditioned on \(\cH_\phi\);
        \item For \(t \leq T_{\psi}^{2,1}\), for all \(j' \neq j\), we have \(\logit_{5,j'}^{(t)} \leq O(\sqrt{d})\cdot \logit_{5,j'}^{(T_{\psi}^1)}\).
        \item For any \(\phi' = (g',y') \neq \phi\), \(V_{j,r}^{(t)}(g'), V_{j,r}^{(t)}(y') \in (-B - O(1), \tO(\mu))\);
        \item For \(t \leq T_{\psi}^{2,2}\), the number of iterations where \(V_\psi^{(t+1)} - V_\psi^{(t)} < 0\) is bounded by \(\tO(1/d^{c_1}\eta)\).
    \end{enumerate}
\end{induction}

\subsubsection{Technical Lemmas}

\begin{lemma}[gradient estimation]\label{lem:grad-estimate-phase-2-simply}
    Assuming \cref{induction:mlp-simply-transitive,induction:phase-II-simply}, let \(\psi = (j,r,\phi) \in \Sigma^\star, \phi = (g,y)\) and \(t \in [T_{\psi}^{2,1}, T_\psi^2]\), then 
    \begin{enumerate}[(a)]
        \item At any \(t \geq T_{\psi}^{2,1}\), if \(V_{\psi}^{(t)} \in (\frac{1}{3}\log d,\frac{2}{3}\log d)\), then \(\Gamma_{j,r,\phi}^{(t)} \geq \Omega(1)\).
        \item Any \(g' \neq g \in \cG, y'\neq y \in \cY\) has the following gradient approximation
        \begin{align*}
            \Gamma_{j,r,g'}^{(t)}  &= - \E\Big[\logit_{5,j}^{(t)}\ReLU'(\Lambda_{5,j,r}^{(t)})\1_{\cH_{(g',y)}}] \pm O(\varpi/n_y) \\
            \Gamma_{j,r,y'}^{(t)}  &= - \E\Big[\logit_{5,j}^{(t)}\ReLU'(\Lambda_{5,j,r}^{(t)})\1_{\cH_{(g,y')}}] \pm O(\varpi/n_y)
        \end{align*}
        \item let \(\phi' = (g',y') \neq \phi \in \Phi^\star_j\) if \(V_{j,r}(g,y) \geq B - o(1)\) and \(V_{j,r}(g',y') < -B-2\mu\), we have 
        \begin{align*}
            \Gamma_{j,r,g'}^{(t)} & = \sum_{\tilde{y}\in\cY, V_{j,r}(g',\tilde{y}) \in (-B, -\varpi - \mu) }\Omega(\frac{\lambda \varpi}{d n_y^3})  -  \E\Big[\logit_{5,j}^{(t)}\ReLU'(\Lambda_{5,j,r}^{(t)})\1_{\cH_{(g',y)}}\Big]
        \end{align*}
        and similarly for \(\Gamma_{j,r,y'}^{(t)}\).
    \end{enumerate}
\end{lemma}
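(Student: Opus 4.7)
The plan is to decompose each $\Gamma$-quantity by conditioning on the ``other'' random coordinate (i.e., $y_0$ when tracking $V_{j,r}(g')$, and symmetrically $g_1$ when tracking $V_{j,r}(y')$), and then bound each resulting contribution using (i) the feature ranges in \Cref{induction:phase-II-simply}(c), (ii) the piecewise structure of $\mathbf{sReLU}'$ from \Cref{assump:modify-relu}, and (iii) the logit controls in \Cref{fact:logit-lower-bound}. Since $\Wb_{5,j,r,p}$ is frozen at initialization for $p \in \{1,3,4\}$ by \Cref{assump:no-learning-x}, the pre-activation simplifies to $\Lambda_{5,j,r}(\Zb^{1,0}) = V_{j,r}(g_1) + V_{j,r}(y_0) + \mu \pm \tilde O(\sigma_0)$, and the entire analysis reduces to the two learnable components.

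For part (a), on $\cH_\phi$ the pre-activation equals $V_\psi^{(t)} + \mu \pm \tilde O(\mu)$, which by the hypothesis $V_\psi^{(t)} \in (\tfrac13\log d, \tfrac23\log d)$ lies in the linear regime $(\varrho, B)$ of $\mathbf{sReLU}$, giving $\mathbf{sReLU}'(\Lambda)=1$. Moreover, because no other logit $F_{5,j'}$ has grown beyond a small constant on $\cH_\phi$ (by \Cref{induction:individual-feature-simply}, which guarantees both that unlearned features are near initialization and that previously-learned features do not activate on $\cH_\phi$), the softmax denominator is $d(1+o(1)) + e^{V_\psi}$, so $\logit_{5,j} = O(d^{-1/3})$ and $1-\logit_{5,j}\ge 1/2$. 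Multiplying by $\Pr[\cH_\phi] = 1/n_y^2$ and subtracting the negligible contribution from $\cH^\dagger_\phi$ (where the logit is tiny and the activation range is similar) yields $\Gamma_{j,r,\phi}^{(t)} \ge \tilde\Omega(1)$.

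For part (b), I expand $\Gamma_{j,r,g'}^{(t)}$ by partitioning over $y_0$ into three buckets: (I) $y_0 = y$, the confusing event $\cH_{(g',y)}$; (II) $y_0 = y^\ast$, where $(g', y^\ast)$ is the unique partner of $g'$ in $\Phi^\star_j$ guaranteed by simple transitivity; (III) any remaining $y_0$. On buckets (II) and (III), both $V_{j,r}(g')$ and $V_{j,r}(y_0)$ are non-target components in neuron $(j,r)$ and lie in $(-B-O(1), \tilde O(\mu))$ by \Cref{induction:phase-II-simply}(c); consequently $\Lambda_{5,j,r}$ sits either in the near-zero polynomial regime (derivative $O((\mu/\varrho)^{q-1}) \ll \varpi$) or on the $-\varpi$ plateau, so $|\mathbf{sReLU}'(\Lambda)| \le O(\varpi)$. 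Each conditioning event has probability $1/n_y^2$, and summing over at most $n_y$ such values gives total error $O(\varpi/n_y)$, matching the claim; bucket (I) is the only surviving (dominant) term. The argument for $\Gamma_{j,r,y'}^{(t)}$ is symmetric after swapping the roles of $g_1$ and $y_0$.

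For part (c), I again condition on $y_0$ but now exploit $V_{j,r}(g,y) \ge B - o(1)$ together with $V_{j,r}(g',y') < -B - 2\mu$ (and hence, via \Cref{def:feature-shape-simply}(5), $V_{j,r}(g'), V_{j,r}(y')$ individually well below $-B$). I isolate each $\tilde y$ whose composite feature $V_{j,r}(g', \tilde y) = \tfrac12(V_{j,r}(g') + V_{j,r}(\tilde y))$ places $\Lambda_{5,j,r}$ squarely on the plateau $(-B, -\varpi]$, where $\mathbf{sReLU}'(\Lambda) = -\varpi$. Such $\tilde y$ must differ from $y$, so $(g', \tilde y) \in \Phi^\star_{j'}$ for some $j' \neq j$; the loss signal is therefore $\cE_{5,j} = -\logit_{5,j}$, and \Cref{fact:logit-lower-bound} supplies $\logit_{5,j} \ge \Omega(\lambda/(n_y d))$. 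The product of the two negatives yields $\Omega(\varpi\lambda/(n_y^3 d))$ per valid $\tilde y$, assembling into the claimed sum, while bucket (I) again produces the dominant negative confusing term. The main obstacle is precisely this sign bookkeeping in (c): the negative-plateau derivative $-\varpi$ must combine with the negative loss derivative $-\logit_{5,j}$ to yield a provably \emph{positive} contribution of the stated order, which relies crucially on the parameter placement $\varpi \in (d^2\mu^{q-1}, \lambda/d^{q/3})$ from \Cref{assump:modify-relu} that calibrates the plateau contribution against both the polynomial regime and the logit floor, and on verifying that the plateau is actually reached (i.e., that $\Lambda_{5,j,r}$ does not fall below $-B$ where the derivative collapses to zero).
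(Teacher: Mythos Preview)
Your proposal is correct and follows the same decomposition as the paper: condition on the non-tracked coordinate, isolate the dominant confusing event $\cH_{(g',y)}$, and bound the residual buckets via the piecewise structure of $\mathbf{sReLU}'$ (the polynomial regime contributing $\tilde O(\mu^{q-1}/\varrho^{q-1}) \ll \varpi$ and the negative plateau contributing exactly $-\varpi$). Two cosmetic points: your pre-activation formula should carry the factor $\tfrac12$, i.e.\ $\Lambda_{5,j,r} = V_{j,r}(\phi) + \mu \pm \tilde O(\sigma_0)$ with $V_{j,r}(\phi)=\tfrac12(V_{j,r}(g_1)+V_{j,r}(y_0))$, which you in fact use correctly in part (a); and in part (c) the parenthetical appeal to \Cref{def:feature-shape-simply}(5) is not the right hook—what is actually needed (and what the paper uses) is simply that $V_{j,r}(g',y') < -B-2\mu$ forces $\Lambda_{5,j,r}<-B$ on $\cH_{\phi'}$, so $\Gamma^{+}_{j,r,g'}$ vanishes and the entire expression reduces to the $\Gamma^{-}$ side, where your sign bookkeeping (minus in front of $\Gamma^-$ times the plateau derivative $-\varpi$) correctly produces the positive $\Omega(\lambda\varpi/(dn_y^3))$ per qualifying $\tilde y$.
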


\begin{proof}
    Firstly (a) is simple to prove, because when \(V_{\psi}^{(t)}\in (\frac{1}{3}\log d,\frac{2}{3}\log d)\), we have \(\logit_{5,j}\1_{\cH_{\phi}^\dagger} \leq O(\frac{1}{d^{c_1}})\), which is from the calculation \(F_{5,j} \leq \frac{2}{3}\log d + O(1)\) guaranteed by \cref{induction:individual-feature-simply,induction:phase-II-simply}. For (b), we can estimate the gradient update of \( V_{j,r}^{(t)}(g')\) where \(g' \neq g\). In fact
    \begin{align*}
        -\nabla_{V_{j,r}(g')}\Loss^{(t)}  & = \Gamma_{j,r,g'}^{(t)} =  \Gamma_{j,r,g'}^{+,(t)} - \Gamma_{j,r,g'}^{-,(t)} \\
        & = \Gamma_{j,r,g'}^{+,(t)} - \sum_{y'\in\cY}\E\Big[\logit_{5,j}^{(t)}\ReLU'(\Lambda_{5,j,r}^{(t)})\1_{\cH_{(g',y')}}\Big] \\
        & = - \E\Big[\logit_{5,j}^{(t)}\ReLU'(\Lambda_{5,j,r}^{(t)})\1_{\cH_{(g',y)}}] \pm \max\{\tO(\mu^{q-1}), \varpi\}
    \end{align*}
    where the last one is because for \(y'\neq y\), the activation gradient \(\ReLU'(\Lambda_{5,j,r}^{(t)})\) is either positive and bounded by \(\tO(\mu^{q-1})\) or negative and bounded by \(\varpi\) while \(\logit_{5,j}\1_{\cH_{\phi'}} \leq 1\). Similarly we can also get the same for \(V_{j,r}(y')\) where \(y'\neq y\):
    \begin{align*}
        -\nabla_{V_{j,r}(y')}\Loss^{(t)} & = \Gamma_{j,r,y'}^{(t)} = - \E\Big[\logit_{5,j}^{(t)}\ReLU'(\Lambda_{5,j,r}^{(t)})\1_{\cH_{(g',y)}}] \pm O(\varpi/n_y^2)
    \end{align*}
    To prove (c), notice that when \(V_{j,r}(g',y') < -B - 2\mu\), the part of \(\Gamma_{j,r,g'}^{+,(t)} = 0\) due to \(\Lambda_{5,j,r}\) exceeding the boundary \(-B\). We can group different \(\tilde{y}\neq y\in \cY\) by their feature magnitude and obtain the lower bound. The factor of \(\frac{\lambda}{dn_y}\) is due to the logit lower bound from \cref{fact:logit-lower-bound}.
\end{proof}

\begin{lemma}[Feature Magnitude]\label{lem:feature-magnitude-phase-2-simply}
    Assume \cref{induction:mlp-simply-transitive,induction:phase-II-simply} holds, then for any \(\psi \in \Sigma^\star\), 
    \begin{enumerate}[(a)]
        \item at \(t = T_{\psi}^{2,1}\), we have \(V_{j,r}^{(t)}(g) = \frac{1}{4}\log d \pm o(1)\), \(V_{j,r}^{(t)}(y) = \frac{1}{4}\log d \pm o(1)\);
        \item when \(\cC_\psi^+(\delta)\) hold for some \(\delta \leq O(\lambda/d^{c_1})\) at some \(t \geq T_{\psi}^{2,1}\), we have \(V_{\psi}^{(t)} \geq B - O(d^{c_1}\mu)\), that is, \(\cF_{\psi,1}(d^{c_1}\mu)\) holds.
    \end{enumerate}

\end{lemma}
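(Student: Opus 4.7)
My plan is to show that the two components \(V_{j,r}^{(t)}(g)\) and \(V_{j,r}^{(t)}(y)\) remain balanced up to \(o(1)\) throughout \(t \le T_{\psi}^{2,1}\), which combined with the threshold definition of \(T_{\psi}^{2,1}\) yields the claimed even split. By \eqref{eq:grad-computation-g-simply} and \eqref{eq:grad-computation-y-simply} in \cref{fact:grad-computation-simply}, the two coordinates receive the identical positive contribution \(\tfrac{1}{2}\Gamma^{+}_{\psi}\), while their negative parts are supported on the disjoint symmetric events \(\cH^{\dagger}_{\phi,2}=\{g_1=g,\,y_0\ne y\}\) and \(\cH^{\dagger}_{\phi,1}=\{g_1\ne g,\,y_0=y\}\), each of probability \(\Theta(1/n_y)\). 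Throughout \(t \le T_{\psi}^{2,1}\), \cref{lem:phase-I-simply}c (for Phase I) and \cref{induction:phase-II-simply}a,b (for Phase II.1) give \(\logit_{5,j}\cdot\mathbf{1}_{\cH^{\dagger}_{\phi}} = O(d^{-1+o(1)})\), so the cumulative negative-gradient imbalance over \(\tO(\mathrm{poly}(d)/\eta)\) iterations is \(o(1)\). Combined with the threshold crossing at \(T_{\psi}^{2,1}\) and the per-step update size of \(O(\eta)\), this yields \(|V_{j,r}^{(T_\psi^{2,1})}(g) - V_{j,r}^{(T_\psi^{2,1})}(y)| = o(1)\), establishing the claimed \(\tfrac{1}{4}\log d \pm o(1)\) for each component.

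\textbf{Part (b): gradient smallness forces saturation.} Since \(\cC_\psi^+(\delta)\) implies \(\Gamma^{+}_{j,r,g}\le\delta\), and \(\Gamma^{+}_{j,r,g} \ge \Gamma^{+}_\psi = \E[(1-\logit_{5,j})\ReLU'(\Lambda_{5,j,r})\mathbf{1}_{\cH_\phi}]\), combining with the universal bound \(1-\logit_{5,j} \ge \lambda\) from \cref{fact:logit-lower-bound} and \(\Pr[\cH_\phi] = 1/n_y^2\) gives
\[
  \E\!\left[\ReLU'(\Lambda_{5,j,r})\,\big|\, \cH_\phi\right] \;\le\; \frac{n_y^2\,\delta}{\lambda} \;=\; O\!\Bigl(\frac{n_y^2}{d^{c_1}}\Bigr) \;=\; o(1).
\]
Under \(\cH_\phi\) we decompose \(\Lambda_{5,j,r} = V_\psi + b_{5,j,r} + \xi\), where \(\xi = \tfrac{1}{2}\vbrack{\Wb_{5,j,r,1}, e_{x_1}} + \tfrac{1}{2}\sum_{p\in\{3,4\}}\vbrack{\Wb_{5,j,r,p}, e_{x_0}}\) has magnitude \(\tO(\sigma_0)\) by \cref{assump:no-learning-x} and \cref{fact:range-parameter-simply}, and \(b_{5,j,r}=\mu\) by \cref{assump:init}. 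Because \(\mathbf{sReLU}'\) equals \(1\) on \((\varrho,B]\) and \(0\) on \((B,\infty)\), the vanishing conditional expectation above forces \(V_\psi + \mu + \xi \ge B\) with probability \(1-o(1)\) under \(\cH_\phi\); combined with the \(\tO(\sigma_0)\) concentration of \(\xi\), this yields \(V_\psi \ge B - \mu - \tO(\sigma_0) = B - O(\mu)\), which is stronger than and hence implies the claimed \(V_\psi \ge B - O(d^{c_1}\mu)\).

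\textbf{Main obstacle.} The key technical subtlety is upgrading the conditional-expectation bound on \(\ReLU'(\Lambda)\) to a pointwise lower bound on \(V_\psi\). Because \(\mathbf{sReLU}'\) has a sharp step transition at \(B\), tight concentration of the noise \(\xi\) is required so that the event \(\{V_\psi+\mu+\xi<B\}\) exhibits a clean dichotomy: either \(V_\psi < B - \mu - \tO(\sigma_0)\), in which case the event has probability \(1-o(1)\) and contradicts the expectation bound, or \(V_\psi \ge B - \mu - \tO(\sigma_0)\). I will establish this via the orthogonality of embeddings (\cref{def:token-embedding}), the Gaussian initialization of \(\Wb\) (\cref{assump:init}), and the independence across choices of \(x_0, x_1\) sampled without replacement, yielding sub-Gaussian tails for \(\xi\) at scale \(\tO(\sigma_0)\). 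The slack between the sharper \(O(\mu)\) bound we actually prove and the looser \(O(d^{c_1}\mu)\) stated in the lemma is deliberate and anticipates small perturbations that arise in subsequent phases when other features \(\psi' \succ \psi\) in \(\Sigma^\star\) are learned and can slightly push \(V_\psi\) down.
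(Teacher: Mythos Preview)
Your approach is the same as the paper's in both parts; two points need attention.

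\textbf{Part (a), minor imprecision.} During Phase II.1 the wrong-class logit on \(\cH_\phi^\dagger\) is only \(O(d^{-1/2})\), not \(O(d^{-1+o(1)})\): once \(V_\psi\) has entered \(\Theta(\log d)\), the output \(F_{5,j}\) on \(\{g_1=g,\,y_0\neq y\}\) can be as large as \(V_{j,r}(g)\approx V_\psi\le\tfrac12\log d\), so \(\logit_{5,j}\lesssim e^{F_{5,j}}/d\approx d^{-1/2}\). This is exactly the bound the paper uses (its proof writes \(\Gamma^{-}_{j,r,g}=O(d^{-1/2}\,\ReLU'(V_{j,r}(g)+O(\mu)))\) and similarly for \(y\)). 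Your \(o(1)\) conclusion survives because the accumulated negative mass is \(O(d^{-1/2})\) times the accumulated positive mass \(\approx\tfrac14\log d\).

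\textbf{Part (b), genuine gap.} Your dichotomy ``\(V_\psi<B-\mu-\tO(\sigma_0)\Rightarrow\) contradiction'' tacitly assumes \(\Lambda>\varrho\). From \(\E[\ReLU'(\Lambda)\mid\cH_\phi]=o(1)\) you only conclude \(\Pr[\Lambda\in(\varrho,B]\mid\cH_\phi]=o(1)\); this is consistent with \(\Lambda>B\) (what you want) \emph{or} with \(\Lambda\le\varrho\), i.e.\ \(V_\psi\) having fallen back to \(O(\varrho)\). You never exclude the second branch. The paper closes this via \cref{lem:grad-estimate-phase-2-simply}(a): whenever \(V_\psi\in(\tfrac13\log d,\tfrac23\log d)\) the gradient satisfies \(\Gamma_{j,r,\phi}\ge\Omega(\Pr(\cH_\phi))\gg\delta\), so at any time where \(\cC_\psi^+(\delta)\) holds \(V_\psi\) cannot lie in that interval; together with the bounded number of decreasing steps in \cref{induction:phase-II-simply}(d), \(V_\psi\) cannot have dropped below \(\tfrac13\log d\) after \(T_\psi^{2,1}\). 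You need this one-line argument before your concentration-of-\(\xi\) step applies.
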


\begin{proof}
    Let's proceed the proof one by one.
    \begin{itemize}
        \item Proof of (a) :This can be computed by comparing   the gradient of \(V_{j,r}^{(t)}(g)\) which is \(\Gamma_{j,r,g}^{(t)} = \Gamma_{j,r,g}^{+,(t)} - \Gamma_{j,r,g}^{-,(t)}\) and the gradient of \(V_{j,r}^{(t)}(y)\) which is \(\Gamma_{j,r,g}^{(t)} = \Gamma_{j,r,y}^{+,(t)} - \Gamma_{j,r,y}^{-,(t)}\). Now since 
        \begin{align*}
            \Gamma_{j,r,g}^{+,(t)}  = \Gamma_{j,r,y}^{+,(t)} \geq (1 - O(\frac{1}{\sqrt{d}}))\ReLU'(V_{j,r}^{(t)}(g) + V_{j,r}^{(t)}(y) + O(\mu))
        \end{align*}
        and that
        \[\Gamma_{j,r,g}^{-,(t)} = O(\frac{1}{\sqrt{d}}\ReLU'(V_{j,r}^{(t)}(g) + O(\mu))|),\quad \Gamma_{j,r,y}^{-,(t)} = O(\frac{1}{\sqrt{d}}\ReLU'(V_{j,r}^{(t)}(y) + O(\mu))|)\]
        one could see that \( \sum_{t\leq T_{\psi}^{2,1}} |\Gamma_{j,r,g}^{(t)}  - \Gamma_{j,r,y}^{(t)} | \leq \tO(\frac{1}{\sqrt{d}})\) and thus (a) holds.
        \item Proof of (b): from \cref{lem:grad-estimate-phase-2-simply}a we know that when \(\cC_\psi^+(\delta)\) holds for some small \(\delta = O(\frac{\lambda}{d^{c_1}})\) it is not because \(V_{\psi}^{(t)}\) dropped below \(\frac{1}{2}\log d\), thus when \cref{induction:mlp-simply-transitive,induction:phase-II-simply} holds, it can only be because \(V_{\psi}^{(t)} \geq B - O(d^{c_1}\mu)\), otherwise 
        \begin{align*}
            \Gamma_{\psi}^{+,(t)} = \E[(1 - \logit_{5,j}^{(t)})\1_{\cH_\phi}] \geq \lambda \Pr(\cH_\phi) \gg \frac{\lambda}{d^{c_1}} \tag{if \(V_{\psi}^{(t)} \leq B - O(d^{c_1}\mu)\)}
        \end{align*}
        which concludes the proof.
    \end{itemize}
    
\end{proof}

\begin{lemma}[Gradient Stationarity]\label{lem:grad-stationarity-simply}
    Let \(\psi \in \Sigma^\star\), assume \cref{induction:mlp-simply-transitive,induction:individual-feature-simply} holds at \(t \in [T_{\psi}^1, T_{\psi}^2]\), and choose \(\delta \gg n_y\varpi\), then we have the following guarantees: 
    \begin{itemize}
        \item[(a)] Define \( \cB_\delta^+ = \{t \in [T_{\psi}^{2,1}, T_\psi^2] \mid \cC^{+}_\psi(\delta) \text{ doesn't hold }\}\), then \(|\cB_\delta^+| \leq O(\frac{n_y\log^2d}{\eta\delta})\);
        \item[(b)] Define \( \cB_\delta^- = \{t \in [T_{\psi}^{2,1},T_\psi^2] \mid \cC^{-}_\psi(\delta) \text{ doesn't hold }\}|\), then \(|\cB_\delta^-| \leq O(\frac{n_y\log^3d}{\eta\delta})\);
        \item[(c)] Finally, at \(t = T_\psi^2\), the feature shape \(\cF_{\psi}(\delta_1,\delta_2)\) holds with \(\delta_1 = d^{c_1}\mu\) and \(\delta_2 = (\frac{n_y^3 d \varpi}{\lambda})^{\frac{1}{q-1}}\).
    \end{itemize}
\end{lemma}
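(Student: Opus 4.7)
My strategy leverages the uniform boundedness of the feature magnitudes $V_{j,r}(v) \in [-B-O(\mu),\, B+O(\mu)]$ for all $v \in \cG\cup\cY$ (\Cref{induction:phase-II-simply}(c)) to derive budget bounds on the cumulative positive and negative gradient streams, and then to translate the resulting stationarity into the five components of $\cF_\psi$ required by (c). The basic mechanism is that any ``active'' iteration consumes a quantum of bounded potential, so active iterations must be sparse.

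For part (a), I start from the identity $V_{j,r}(g)^{(t+1)} - V_{j,r}(g)^{(t)} = \tfrac{\eta}{2}\bigl(\Gamma^{+,(t)}_{j,r,g} - \Gamma^{-,(t)}_{j,r,g}\bigr)$ given by \Cref{fact:grad-computation-simply}(a). Summing over $t \in [T_\psi^{2,1}, T_\psi^2]$ yields $\eta\sum_t \Gamma^{+,(t)}_{j,r,g} \le 2B + \eta\sum_t \Gamma^{-,(t)}_{j,r,g}$. To control the second term, I bound each cumulative $\E[\logit_{5,j}\,\ReLU'(\Lambda_{5,j,r})\1_{\cH_{(g,y')}}]$ for $y' \neq y$ by applying the same identity to $V_{j,r}(y')$: \Cref{lem:grad-estimate-phase-2-simply}(b) identifies its gradient with the negative of this expectation (up to $O(\varpi/n_y)$), while the lower bound $V_{j,r}(y') \ge -B - O(\mu)$ caps its total drop at $O(\log d)$. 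Summing over the $n_y-1$ values of $y'\neq y$ gives $\eta\sum_t \Gamma^{-,(t)}_{j,r,g} \le O(n_y\log d)$, hence $\eta\sum_t \Gamma^{+,(t)}_{j,r,g} \le O(n_y\log d)$, and Markov caps the iterations with $\Gamma^{+,(t)}_{j,r,g} > \delta$ at $O(n_y\log d/(\eta\delta))$. A symmetric argument for $V_{j,r}(y)$ and a union bound conclude (a). Part (b) runs the analogous budget in reverse; the additional $\log d$ factor absorbs the partition of the negative gradient into contributions from the two branches (the linear $\varpi$-branch on $(-B,-\varrho]$ and the polynomial branch on $(-\varrho,0]$) of the modified $\mathbf{sReLU}'$ in \Cref{assump:modify-relu}.

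For part (c), I verify each component of $\cF_\psi$ at $t = T_\psi^2$ separately. $\cF_{\psi,1}$ follows from \Cref{lem:feature-magnitude-phase-2-simply}(b) once $\cC_\psi^+(\lambda/d^{c_1})$ holds, which is enforced by (a). $\cF_{\psi,5}$ exploits the exact identity $\Gamma^{+,(t)}_{j,r,g} \equiv \Gamma^{+,(t)}_{j,r,y} = \E[(1-\logit_{5,j})\ReLU'(\Lambda_{5,j,r})\1_{\cH_\phi}]$, so the difference $V_{j,r}(g) - V_{j,r}(y)$ evolves only through negative gradient discrepancies controlled by (b). $\cF_{\psi,3}$ is direct once the corresponding $V_{j,r}(g'), V_{j,r}(y')$ saturate at $-B$ under persistent negative gradients, and $\cF_{\psi,4}$ is inherited from the Phase-I competition in \Cref{lem:competition-simply}. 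The delicate piece is $\cF_{\psi,2}$: showing that $V_{j,r}(g) + V_{j,r}(y')$ (and symmetrically $V_{j,r}(g')+V_{j,r}(y)$) settles at $\tilde O(\delta_2)$ with $\delta_2 = (n_y^3 d\varpi/\lambda)^{1/(q-1)}$. I would prove this via a fixed-point argument: whenever the sum overshoots $\delta_2$ upward, the logit on $j$ conditional on $\cH_{(g,y')}$ becomes large enough that the polynomial branch of $\ReLU'$ produces a restoring negative gradient of order $\lambda\delta_2^{q-1}/\varrho^{q-1}$, which exactly balances the $O(n_y^3 d\varpi/\lambda)$ upward drift from the linear $\varpi$-branch of $\mathbf{sReLU}'$ active on $(-B,-\varrho]$. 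The principal obstacle is this equilibrium analysis for $\cF_{\psi,2}$: coupling the discretized gradient-descent dynamics with the piecewise structure of the modified activation, and showing the fixed point is attractive within the Phase-II time window, requires a careful treatment of gradient flow near the $-\varrho$ kink and the steady $O(\varpi)$ drift it induces---without this matching, $\cF_{\psi,2}$ would not close with the claimed exponent $1/(q-1)$.
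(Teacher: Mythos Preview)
Your budget argument for parts (a) and (b) is a valid and genuinely different route from the paper's. The paper constructs the potential \(\Upsilon^{(t)} := V_{j,r}^{(t)}(\phi) - \sum_{\phi'\in\Phi_j^\star\setminus\{\phi\}}V_{j,r}^{(t)}(\phi')\) and exploits a cancellation identity---namely \(\Gamma_{j,r,\phi}^{-,(t)} = \sum_{\phi'\neq\phi}\Gamma_{j,r,\phi'}^{-,(t)} \pm O(\varpi)\), which holds because each confounding event \(\cH_{(g'',y'')}\) appears exactly once on each side---to show that the update of \(\Upsilon\) is essentially \(\eta\Gamma_{j,r,\phi}^{+,(t)}\). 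Your approach is more elementary: you telescope each \(V_{j,r}(v)\) directly and chain via \Cref{lem:grad-estimate-phase-2-simply}(b). Both incur the same accumulated \(O((T_\psi^2-T_\psi^{2,1})\eta\varpi)\) error and yield equivalent bounds. The paper's cancellation structure is what generalizes cleanly to the symmetry case; your argument is self-contained and arguably simpler for the simply transitive setting.

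Part (c) has a real gap at \(\cF_{\psi,5}\). You correctly observe that \(\Gamma_{j,r,g}^{+,(t)}\equiv\Gamma_{j,r,y}^{+,(t)}\), so \(V_{j,r}(g)-V_{j,r}(y)\) evolves only through \(\tfrac{\eta}{2}(\Gamma_{j,r,y}^{-,(t)}-\Gamma_{j,r,g}^{-,(t)})\). But (b) bounds only the \emph{number} of iterations with \(|\Gamma_\psi^-|>\delta\); it does not control the cumulative discrepancy. Over \(T_\psi^2-T_\psi^{2,1}\sim 1/(\eta(d^{c_1}\mu)^{q-2})\) iterations, even a per-step residual of order \(\delta\) on ``good'' iterations accumulates to \(\delta/(d^{c_1}\mu)^{q-2}\gg\delta_2\) for any admissible \(\delta\gg n_y\varpi\). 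The paper instead proves a separate conservation law (\Cref{lem:symmetry-of-features-simply}): writing each \(V_{j,r}(v)\) as cumulative positive minus negative contributions gives \(\sum_{g\in\cG}V_{j,r}(g)=\sum_{y\in\cY}V_{j,r}(y)\pm O(\mu)\), and combining this with the already-established \(\cF_{\psi,2}\) constraints algebraically forces \(V_{j,r}(g)=V_{j,r}(y)\pm O(\delta_2)\). Your plan for \(\cF_{\psi,2}\) also needs adjustment: the paper does not run a fixed-point balance but applies (b) with \(\delta=n_y^2\varpi\), lower-bounds the logit by \(\Omega(\lambda/(n_yd))\) via \Cref{fact:logit-lower-bound} to back out \(|\ReLU'(\Lambda)|\le O(n_y^3 d\varpi/\lambda)\) directly, and then maintains this through a two-regime induction on whether \(V_{j,r}(\phi')<-B-2\mu\); your restoring-force-vs-drift balance as written does not recover the \(1/(q-1)\) exponent without invoking this logit lower bound.
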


\begin{proof}
    Let us start with (a). First by writing \(\psi = (j,r,\phi)\), we define the following quantity
    \begin{equation}\label{eqdef:core-quantity-simply}
        \Upsilon^{(t)} := V_{j,r}^{(t)}(\phi) - \sum_{\phi'\in \Phi^\star_j\setminus\{\phi\}}V_{j,r}^{(t)}(\phi')
    \end{equation}
    which sits at the central of our proof of counter-argument.Now we have for any \(T \geq T_{\psi}^{2,1}\)
    \begin{align*}
        \Upsilon^{(t)} & = V_{j,r}^{(T)}(\phi) - \sum_{\phi'\in \Phi^\star_j\setminus\{\phi\}}V_{j,r}^{(T)}(\phi') \\
        & = V_{j,r}^{(T_{\psi}^{2,1})}(\phi) - \sum_{\phi'\in \Phi^\star_j\setminus\{\phi\}}V_{j,r}^{(T_{\psi}^{2,1})}(\phi')  + \sum_{s\in [T_{\psi}^{2,1}, t)} \eta\Gamma_{j,r,\phi}^{(t)} - \sum_{\phi'\in \Phi^\star_j\setminus\{\phi\}}\sum_{s\in [T_{\psi}^{2,1}, t)} \eta\Gamma_{j,r,\phi'}^{(t)}
    \end{align*}
    We can further rewrite the \(\Gamma\)s on the RHS to 
    \begin{align}
        &\sum_{t\in [T_{\psi}^{2,1}, T)} \eta\Bigg(\Gamma_{j,r,\phi}^{(t)} - \sum_{\phi'\in \Phi^\star_j\setminus\{\phi\}}\eta\Gamma_{j,r,\phi'}^{(t)} \Bigg) \nonumber \\
        = \ &\sum_{t\in [T_{\psi}^{2,1}, T)}  \eta\Bigg(\Gamma_{j,r,\phi}^{+, (t)} - \Gamma_{j,r,\phi}^{-, (t)} - \sum_{\phi'\neq \phi \in \Phi_j^\star}\Gamma_{j,r,\phi'}^{+,(t)} + \sum_{\phi'\neq \phi \in \Phi_j^\star}\Gamma_{j,r,\phi'}^{-,(t)} \Bigg) \label{eq:expansion-core-quantity-simply}
    \end{align}
    Now we can start to estimate the RHS. Define \(P_1 = \{\phi'\neq \phi\in \Phi^\star\mid \phi'\text{ learned}\}\)\footnote{By \(\phi'\) is learned we mean there is a feature \(\psi'\prec \psi \in \Sigma^\star\) such that \(\psi' = (j,r,\phi')\).} and \(P_2 = \{\phi'\neq \phi\in \Phi^\star\mid \phi'\text{ not learned}\}\). The sum \( \sum_{\phi'\neq \phi \in \Phi_j^\star}\Gamma_{j,r,\phi'}^{+,(t)}\) can be decomposed and bounded by 
    \begin{align*}
        &- O(\eta n_y\varpi) \leq \sum_{\phi' \in P_1}\eta\Gamma_{j,r,\phi}^{+,(t)} +  \sum_{\phi'\in P_2}\eta\Gamma_{j,r,\phi}^{+,(t)}\leq  \tO(\eta \lambda\mu^{q-2})
    \end{align*}
    and then by the same methods of bounding the gradients, we have that 
    \begin{align*}
        &\quad \sum_{\phi'\neq \phi \in \Phi_j^\star}\Gamma_{j,r,\phi'}^{-,(t)}  \\
        &= \sum_{\phi' = (g',y') \in \Phi_j^\star \setminus \{\phi\}} \E\Bigg[\logit_{5,j}^{(t)}\ReLU'(\Lambda_{5,j,r}^{(t)})\Big(\sum_{g''\neq g' \in \cG}\1_{\cH_{(g'',y')}}+ \sum_{y''\neq y' \in \cY}\1_{\cH_{(g',y'')}}\Big)\Bigg] \\
        & \stackrel{\ding{173}}{}= \sum_{g'\neq g} \E[\logit_{5,j}^{(t)}\ReLU'(\Lambda_{5,j,r}^{(t)})\1_{\cH_{(g',y)}}] + \sum_{y'\neq y} \E[\logit_{5,j}^{(t)}\ReLU'(\Lambda_{5,j,r}^{(t)})\1_{\cH_{(g,y')} }] \pm O(\varpi) \\
        & = \Gamma_{j,r,\phi}^{-,(t)} \pm O(\varpi)
    \end{align*}
    where \ding{172} is because for any incorrect feature combination \(\phi' = (g',y') \notin \Phi^\star_j\) and does not share a component with \(\phi\), the activation of \(\Lambda_{5,j,r}\) conditioned on \(\cH_{\phi'}\) is within \([-B, \tO(\mu^{q-1})]\) and thus the negative gradient is upper bounded by \(\varpi/n_y^2\) and lower bounded by \(-\tO(\mu^{q-1})\). Inserting this back to \cref{eq:expansion-core-quantity-simply}, we shall have 
    \begin{align*}
        \sum_{t\in [T_{\psi}^{2,1}, T)} \eta\Bigg(\Gamma_{j,r,\phi}^{(t)} - \sum_{\phi'\in \Phi^\star_j\setminus\{\phi\}}\eta\Gamma_{j,r,\phi'}^{(t)} \Bigg)
        = \ & \sum_{t\in [T_{\psi}^{2,1}, T)}\eta\Gamma_{j,r,\phi}^{+, (t)} \pm O((T - T_{\psi}^{2,1})\eta \varpi  )
    \end{align*}
    Now we have arrived at the desired estimate of the update, finally, we have the following counter-argument: if \(\cC_\psi^+(\delta)\) does not hold for \(\Omega(\frac{n_y\log^2d}{\eta\delta})\) iterations, letting \(T-1\) be the last iteration which \(\cC_\psi^+(\delta)\) does not hold, we have 
    \begin{align*}
        \Upsilon^{(T)} &\geq \sum_{t\in [T_{\psi}^{2,1}, T)}\eta\Gamma_{j,r,\phi}^{+, (t)} \\
        &\geq  \sum_{t\in [T_{\psi}^{2,1}, T]\setminus B_\delta^+ }\eta\delta +  \Omega((T - T_{\psi}^{2,1}) \eta \varpi) \tag{For \(t\in B_\delta^+\), \(\Gamma_{j,r,\phi}^{+, (t)} \geq 0\)} \\
        & \geq \sum_{t\in [T_{\psi}^{2,1}, T]\setminus B_\delta^+ }\eta\delta - O( \frac{n_y\log^2d}{\eta\delta}\eta \varpi ) \\
        & \geq \Omega(\frac{n_y\log^2d}{\eta\delta}) \eta \delta  \tag{by choosing \(\delta \gg \varpi\)}\\
        & \geq \Omega(n_y\log^2 d)
    \end{align*}
    which is impossible because the gradient would have vanished when \(\Upsilon^{(T)} \geq n_y \log^{1.5} d\) as that would mean \(\exists \phi \in \Phi_j^\star\) such that \(|V_{j,r}^{(t)} (\phi)| \geq \log^2 d \) and it is forbidden in our setting, which proves the claim via contradiction.
    
    \textbf{Proof of \cref{lem:grad-stationarity-simply}b}. Actually, by applying (a) combined with \cref{lem:feature-magnitude-phase-2-simply}b, we know that for some \(T \geq T_{\psi}^{2,1} + \tOmega(\frac{1}{\eta \lambda})\) it holds \(V_{\psi}^{(T)} \geq B - O(d^{c_1}\mu)\). From this iteration forward we shall prove that \(V_{\psi}^{(t)}\) stays above \(B - O(d^{c_1}\mu)\) and can help. In fact, at \(t = T\), \(V_{j,r}^{(t)}(V) \in (-O(B), \tO(\mu))\) for \(v \in \cG\cup\cY \setminus\{g,y\}\) by \cref{induction:phase-II-simply}. Say \(v = g'\neq g\), then due to \cref{lem:grad-estimate-phase-2-simply}, its gradient is approximated by
    \begin{align}\label{eq:grad-g-prime-phase2-simply}
        - \E\Big[\logit_{5,j}^{(t)}\ReLU'(\Lambda_{5,j,r}^{(t)})\1_{\cH_{(g',y)}}] \pm O(\varpi/n_y)
    \end{align}
    Let \(\delta \geq \varpi\), we will find an iteration \(t \leq T_{\psi}^{2,1} + O(\frac{n_y \log^3 d}{\eta\delta})\) that satisfy \(\cC_{\psi}^{-}(\delta)\). By applying (a), we first get that \(\cC_\psi^+(\delta)\) holds for no more than \(O(\frac{n_y \log^2 d}{\eta\delta})\) iters after \(T_{\psi}^{2,1}\) and before \(T_{\psi}^{2}\).
    
    Let's assume this is the case, that \(\cC_{\psi}^{-}(\delta)\) doesn't hold for more than \(\Omega(\frac{n_y \log^3 d}{\eta\delta})\) iterations.
    In this case, it must be that the gradient \(\Gamma_{j,r,\phi}^{-,(t)} < -\delta\) is negative, since when the gradient is positive it has an upper bound \(O(\varpi/n_y) = o(\delta)\) and therefore cannot violate the condition \(\cC_\psi^+(\delta)\). Therefore the updates \(\sum_{s\leq t}\Gamma_{j,r,\phi}^{-,(s)}\) must accumulate to lower than 
    \[
        V_{j,r}^{(t)}(\phi) \leq \Upsilon^{(t)} -\Omega(\frac{n_y \log^3 d}{\eta\delta}) \times \eta\delta = - \Omega(n_y \log^3 d)
    \] 
    which is impossible in our setting. This proved the result for any \(\delta \gg n_y\varpi\). 
    
    \textbf{Proof of \cref{lem:grad-stationarity-simply}c}. Let \(\delta = n_y^2\varpi\), which is covered by \cref{lem:grad-stationarity-simply}a-b, we can find an iteration where \(|\Gamma_{j,r,\phi'}^{(t)}| \leq \delta\)  \(\phi'\in \Phi^\star_j\setminus\{\phi\}\). This implies we have found a \(t \) such that
    \begin{align*}
        |\E[\logit_{5,j}^{(t)}\ReLU'(\Lambda_{5,j,r})\1_{\cH_{\phi'}}]| \leq n_y^2\varpi \implies |\ReLU'(V_{j,r}^{(t)}(\phi') + O(\mu))| \leq O(\frac{n_y^3 d \varpi}{\lambda}) \tag{We lower bounded the logit by \cref{fact:logit-lower-bound}} 
    \end{align*}
    Thus we arrive at \(|V_{j,r}(\phi')| \leq O((\frac{n_y^3 d \varpi}{\lambda})^{\frac{1}{q-1}})\)
    Remember in the proof of b, \(V_{\psi}^{(t)} = \frac{1}{2}( V_{j,r}^{(t)}(g) + V_{j,r}^{(t)}(y)) \geq B - O(d^{c_1}\mu)\) at \(t \geq T + \tOmega(\frac{d^{c_1}}{\eta \lambda})\). So we have for any \(\phi' = (g',y')\in \Phi^\star_j\setminus\{\phi\}\),
    \[
        \frac{1}{2}(V_{j,r}^{(t)}(g') + V_{j,r}^{(t)}(y')) \leq - \frac{1}{2}(V_{j,r}^{(t)}(g) + V_{j,r}^{(t)}(y) ) + O(\frac{1}{d^{1/3q}}) \leq -B + O((\frac{n_y^3 d \varpi}{\lambda})^{\frac{1}{q-1}})
    \]
    Now we have verified that given \(\delta = n_y^2\varpi\), there is a iteration \(T_{\psi}^{2,3}\) which gives the near-perfect feature shape \(\cF_\psi(\delta_1,\delta_2)\) with \(\delta_1 = d^{c_1}\mu, \delta_2 = (\frac{n_y^3 d \varpi}{\lambda})^{\frac{1}{q-1}}\). We shall prove that this feature shape will hold until \(t = T_{2}\).
    
    For \(t \geq T_{\psi}^{2,3} \), we verify \(\cF_\psi(\delta_1,\delta_2)\) in \cref{def:feature-shape-simply} one by one:
    \begin{itemize}
        \item[1.] \(\cF_{\psi,1}(d^{c_1}\mu)\): By \cref{lem:grad-stationarity-simply}a we shall find a step after \(t \geq T_{\psi}^{2,1} + O(\frac{d^{c_1}}{\eta \lambda})\) at which \(\cF_{\psi,1}(d^{c_1}\mu)\) is satisfied. In fact, the majority of the steps after \(T_{\psi}^{2,1} + O(\frac{d^{c_1}}{\eta \lambda})\) will satisfy this. We shall prove that combined with the above feature shape guarantee, \(\cF_{\psi,1}(d^{c_1}\mu)\) will hold true until \(t = T_{1}\).
        \item[2.] \(\cF_{\psi,2}((\frac{n_y^3 d \varpi}{\lambda})^{\frac{1}{q-1}})\): From the above argument that applies \cref{lem:grad-stationarity-simply}b, we shall find a time step where \(\cF_{\psi,1}(d^{c_1}\mu)\) and \(\cF_{\psi,2}(d^{-1/3q})\) hold. Now we discuss the possible values of the gradient \(\Gamma_{\psi'}^{-,(t)}, \psi' \in \Sigma_\psi^{\dagger,1}\). In fact, let \(\phi' = (g',y') \in \Phi^\star_j \setminus \{\phi\}\), we have
        \begin{itemize}
            \item Regime A: If \(V_{j,r}(\phi') < -B - 2\mu\), then there is no negative gradient for \(V_{j,r}(g')\) and \(V_{j,r}(y')\) unless \(V_{j,r}(g',y)\) or \(V_{j,r}(g,y')\) is positive. In fact, only one of the feature can be positive to have negative gradient, and in that case the gradient of \(\Gamma_{j,r,v}^{(t)}, v \in \{g',y'\}\) is pointing to the convergence direction \(|V_{j,r}(g',y)| \to 0\).
            \item Regime B: If \(V_{j,r}(\phi') \geq -B - 2\mu\), then there is a possibility of negative gradient for \(V_{j,r}(g')\) and \(V_{j,r}(y')\) even when \(V_{j,r}(g',y)\) or \(V_{j,r}(g,y')\) is negative, In this case the feature \(V_{j,r}(g',y)\) or \(V_{j,r}(g,y')\) might not go the desired direction to converge to zero, but given a \(\delta > n_y^2\varpi\), \(\Gamma_{j,r,(g',y)}^{-, (t)}\) or \(\Gamma_{j,r,(g,y')}^{-, (t)}\) cannot stay above \(\delta\), otherwise the term will dominate and revert the gradient direction. On the other hand, when one side has \(V_{j,r}(g,y') < - \Omega((\frac{n_y^3 d \varpi}{\lambda})^{\frac{1}{q-1}})\), we have 
            \begin{align*}
                V_{j,r}(\phi') \leq - V_{j,r}(\phi) - \Omega((\frac{n_y^3 d \varpi}{\lambda})^{\frac{1}{q-1}}) \leq -B - 2\mu
            \end{align*}
            which is back in regime A and therefore will converge back.
        \end{itemize}
        Induction over the above two regimes for all \(t \in [T_{\psi}^{2,3}, T_\psi^2]\) showed that the feature will indeed satisfy \(\cF_{\psi,2}(\delta)\) for all \(\delta > n_y^2\varpi\) at \(t\).
        \item[3.] \(\cF_{\psi, 3}((\frac{n_y^3 d \varpi}{\lambda})^{\frac{1}{q-1}})\): We need to use both \(\cF_{\psi, 1}\) and \(\cF_{\psi, 2}\) to get the desired result. In fact, we have for any \(\psi' = (j,r,(g',y')) \in \Sigma^{\dagger,2}\), 
        \begin{align*}
            V_{\psi'}^{(t)} = V_{j,r}^{(t)}(g') + V_{j,r}^{(t)}(y') + V_{j,r}^{(t)}(g) + V_{j,r}^{(t)}(y) - V_{j,r}^{(t)}(\phi) = - B + O((\frac{n_y^3 d \varpi}{\lambda})^{\frac{1}{q-1}})
        \end{align*}
        because \(V_{j,r}^{(t)}(g) + V_{j,r}^{(t)}(y')\) and \(V_{j,r}^{(t)}(g') + V_{j,r}^{(t)}(y)\) are bounded due to \(\cF_{\psi, 2}\).
        \item[4.] \(\cF_{\psi,4}(d^{c_1}\mu)\): The number of iterations \([T_{\psi}^1, T_{\psi}^2]\) is no more than \(\tO(\frac{1}{\eta (d^{c_1}\mu)^{q-2}})\), so by using \cref{lem:TPM} along with the induction from the result at \(T_{\psi}^1\) shall conclude the proof.
        \item[5.] \(\cF_{\psi,5}(((\frac{n_y^3 d \varpi}{\lambda})^{\frac{1}{q-1}}))\): By using \cref{lem:symmetry-of-features-simply} and \(\cF_{\psi,2}\) after \(t\geq T_{\psi}^{2,3}\), we have the result.
    \end{itemize}
    This proved (c).
\end{proof}

\begin{lemma}[Symmetry of Features]\label{lem:symmetry-of-features-simply}
    Let \(\psi = (j,r,(g,y)) \in \Sigma^\star\) and \(t \geq T_{\psi}^{2,3}\), we have that if \(|V_{\psi'}|\leq \delta\) for all \(\psi' \in \Sigma^{\dagger,1}\), then
    \begin{align*}
        |V_{j,r}^{(t)}(g) - V_{j,r}^{(t)}(y)| \leq O(\delta)
    \end{align*}
\end{lemma}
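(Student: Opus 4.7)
The plan is to track the imbalance $\Delta^{(t)} := V_{j,r}^{(t)}(g) - V_{j,r}^{(t)}(y)$ from $T_\psi^{2,1}$ to $t$ and bound it by $O(\delta)$ using symmetry in the gradient updates for $V_{j,r}(g)$ and $V_{j,r}(y)$. The base case is balanced: \cref{lem:feature-magnitude-phase-2-simply}(a) gives $V_{j,r}^{(T_\psi^{2,1})}(g), V_{j,r}^{(T_\psi^{2,1})}(y) = \tfrac{1}{4}\log d \pm o(1)$, so $|\Delta^{(T_\psi^{2,1})}| = o(1)$. Moreover, the positive-gradient contributions to $V_{j,r}(g)$ and $V_{j,r}(y)$ coincide, both equaling $\Gamma_{\psi}^{+,(t)} = \E[(1-\logit_{5,j})\ReLU'(\Lambda_{5,j,r})\1_{\cH_\phi}]$. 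Hence
\begin{equation*}
\Delta^{(t+1)} - \Delta^{(t)} \;=\; -\eta\bigl(\Gamma_{j,r,g}^{-,(t)} - \Gamma_{j,r,y}^{-,(t)}\bigr),
\end{equation*}
so changes in $\Delta$ are driven entirely by the asymmetry between the two negative-gradient sums.

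Next I would control these negative gradients under the XOR-feature hypothesis. Expanding via \cref{fact:grad-computation-simply},
\begin{align*}
\Gamma_{j,r,g}^{-,(t)} &= \sum_{y'\neq y}\E\bigl[\logit_{5,j}\ReLU'(V_{j,r}(g)+V_{j,r}(y')+O(\mu))\1_{\cH_{(g,y')}}\bigr], \\
\Gamma_{j,r,y}^{-,(t)} &= \sum_{g'\neq g}\E\bigl[\logit_{5,j}\ReLU'(V_{j,r}(g')+V_{j,r}(y)+O(\mu))\1_{\cH_{(g',y)}}\bigr].
\end{align*}
The hypothesis $|V_{\psi'}|\leq\delta$ for every $\psi'\in\Sigma_\psi^{\dagger,1}$ pins each inner activation into $[-2\delta-O(\mu),\,2\delta+O(\mu)]$, a regime on which the smooth ReLU of \cref{assump:modify-relu} satisfies $|\ReLU'| = O(\max(\varpi,\delta^{q-1}/\varrho^{q-1}))$. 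Combined with $|\logit_{5,j}|\leq 1$ and $n_y-1$ summands, this gives $|\Gamma_{j,r,g}^{-,(t)}|,|\Gamma_{j,r,y}^{-,(t)}| = O(n_y\max(\varpi,\delta^{q-1}))$, and hence the same bound on their difference. Summing the per-step change over $s\in[T_\psi^{2,1},t]$ with $t\leq T_1=\poly(d)/\eta$ produces an accumulated drift of $\poly(d)\cdot n_y\cdot\max(\varpi,\delta^{q-1})$; for the target application in step~5 of \cref{lem:grad-stationarity-simply}(c), where $\delta=(n_y^3d\varpi/\lambda)^{1/(q-1)}$ and $\varpi<\lambda/d^{q/3}$, this drift collapses to $\poly(d)\cdot n_y\cdot\varpi/\lambda \ll \delta$ once $q$ is sufficiently large, and the balanced base case then yields $|\Delta^{(t)}|=O(\delta)$.

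The main obstacle is that the XOR hypothesis is only guaranteed at $t\geq T_\psi^{2,3}$, whereas the argument above wants it on the entire interval $[T_\psi^{2,1},t]$; on the transition window $[T_\psi^{2,1},T_\psi^{2,3}]$ the confounding features $V_{\psi'}$ can vastly exceed $\delta$. I would resolve this by splitting: on $[T_\psi^{2,1},T_\psi^{2,3}]$ use the looser envelope $|V_{\psi'}|\leq O(1)$ from \cref{induction:phase-II-simply}(c) together with a refined per-step comparison showing that, for each $y'$ and each $g'$ with $|V_{j,r}(g)+V_{j,r}(y')|$ or $|V_{j,r}(g')+V_{j,r}(y)|$ above $\varrho$, the activation contributions cancel in pairs up to $o(1)$; on $[T_\psi^{2,3},t]$ activate the sharp bound above via $\cF_{\psi,2}$. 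An alternative route, more in the spirit of \cref{lem:grad-stationarity-simply}(a)--(b), is to locate a nearby iteration $t^\star\in[t-\tO(1/(\eta\delta)),t]$ at which both $\Gamma_{j,r,g}^{(t^\star)}$ and $\Gamma_{j,r,y}^{(t^\star)}$ are simultaneously small; matching these with the identical positive parts forces the two negative-gradient sums to agree up to $O(\delta)$, which translates back through the activation estimate into $|\Delta^{(t^\star)}|=O(\delta)$, and the per-step bound then transports this estimate to $t$.
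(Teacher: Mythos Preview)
Your approach tracks the imbalance $\Delta^{(t)}$ along the trajectory and tries to bound its per-step increments. The paper takes a fundamentally different and much shorter route that avoids the trajectory entirely. The key observation is a \emph{conservation law}: from the gradient expressions in \cref{fact:grad-computation-simply} one can write, for every $g$ and every $y$,
\[
V_{j,r}^{(t)}(g) - V_{j,r}^{(0)}(g) = U_{g,y_g} - \sum_{y'\neq y_g} R_{g,y'},\qquad
V_{j,r}^{(t)}(y) - V_{j,r}^{(0)}(y) = U_{g_y,y} - \sum_{g'\neq g_y} R_{g',y},
\]
where $U$ and $R$ are the accumulated positive and negative contributions. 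Summing the first over all $g\in\cG$ and the second over all $y\in\cY$ produces the \emph{same} right-hand side $\sum_{\phi\in\Phi_j^\star}U_\phi - \sum_{\phi^\dagger\in\Phi_j^\dagger}R_{\phi^\dagger}$, hence $\sum_{g'} V_{j,r}^{(t)}(g') = \sum_{y'} V_{j,r}^{(t)}(y') \pm O(\mu)$ for \emph{all} $t$. Then one invokes the XOR hypothesis only at the final time $t$: since $V_{j,r}(g') = -V_{j,r}(y)+O(\delta)$ for $g'\neq g$ and $V_{j,r}(y') = -V_{j,r}(g)+O(\delta)$ for $y'\neq y$, both sides rewrite as $V_{j,r}(g)-(n_y-1)V_{j,r}(y)$ and $V_{j,r}(y)-(n_y-1)V_{j,r}(g)$ up to $O(n_y\delta)$, and equating gives $|V_{j,r}(g)-V_{j,r}(y)|=O(\delta)$ in one line.

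Your obstacle (the XOR hypothesis is unavailable on $[T_\psi^{2,1},T_\psi^{2,3}]$) is real, and neither of your proposed fixes resolves it. The first fix asserts a ``cancellation in pairs'' on the transition window without a mechanism for pairing off $\ReLU'(V_{j,r}(g)+V_{j,r}(y'))$ against $\ReLU'(V_{j,r}(g')+V_{j,r}(y))$ when these activations can be order $B$. The second fix contains a non-sequitur: having $\Gamma_{j,r,g}^{-,(t^\star)}\approx\Gamma_{j,r,y}^{-,(t^\star)}$ says only that the derivative of $\Delta$ is small at $t^\star$, not that $\Delta^{(t^\star)}$ itself is small; the claimed ``translation back through the activation estimate'' does not follow, since the two negative-gradient sums are over disjoint event sets and their equality places no constraint on $V_{j,r}(g)-V_{j,r}(y)$. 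The paper's static identity sidesteps all of this.
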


\begin{proof}
    For a feature \(\psi = (j,r, \phi) \in \Sigma^\star, \phi = (g,y)\), their update can be represented as:
    \begin{align*}
        V_{j,r}^{(t)}(g) & = U_{g,y} - \sum_{y'\neq y} R_{g,y'} + V_{j,r}^{(0)}(g)\\
        V_{j,r}^{(t)}(y) & = U_{g,y} - \sum_{g'\neq g} R_{g',y} + V_{j,r}^{(0)}(y)
    \end{align*}
    where \(U_{g,y} = \sum_{s\leq t} \eta \Gamma_{j,r,\phi}^{+,(s)}\) and \(R_{g,y'}\) is
    \begin{align*}
        R_{g,y'} = \sum_{s\leq t}\eta\E[\logit_{5,j}^{(s)} \ReLU'(\Lambda_{5,j,r}^{(s)} )\1_{\cH_{g',y}}]
    \end{align*}
    
    Therefore:
    \begin{align*}
        \sum_g  V_{j,r}^{(t)}(g) = \sum_{\phi \in \Phi^\star_j} U_{\phi} - \sum_g\sum_{y'\neq y} R_{g,y'} = \sum_{\phi \in \Phi^\star_j} U_{\phi} - \sum_{\phi^\dagger \in \Phi_j^\dagger} R_{\phi^\dagger} + V_{j,r}^{(0)}(g)\\
        \sum_y  V_{j,r}^{(t)}(y) = \sum_{\phi \in \Phi^\star_j} U_{\phi} - \sum_y\sum_{g'\neq g} R_{g,y'} = \sum_{\phi \in \Phi^\star_j} U_{\phi} - \sum_{\phi^\dagger \in \Phi_j^\dagger} R_{\phi^\dagger} + V_{j,r}^{(0)}(y)
    \end{align*}
    Thus 
    \begin{align*}
        \sum_{g'}  V_{j,r}^{(t)}(g') = \sum_{y'}  V_{j,r}^{(t)}(y') \pm O(\mu)
    \end{align*}
    Moreover, we have assumed
    \begin{align*}
        |V_{j,r}^{(t)}(g) + V_{j,r}^{(t)}(y')| = O(\delta), \forall y'\neq y, \quad |V_{j,r}^{(t)}(y) + V_{j,r}^{(t)}(g')| = O(\delta), \forall g'\neq g
    \end{align*}
    Thus 
    \begin{align*}
        \sum_{g'}  V_{j,r}^{(t)}(g') = V_{j,r}^{(t)}(g)  - (n_y-1)V_{j,r}^{(t)}(y)  \pm O(n_y\delta) = V_{j,r}^{(t)}(y)  - (n_y-1)V_{j,r}^{(t)}(g)  \pm O(n_y\delta)
    \end{align*}
    Therefore
    \begin{align*}
        V_{j,r}^{(t)}(g) = V_{j,r}^{(t)}(y) + O(\delta) + O(\mu)
    \end{align*}
    inserting \(\delta = O((\frac{n_y^2d\varpi}{\lambda})^{\frac{1}{q-1}})\) which is from \cref{lem:grad-stationarity-simply} proves the claim.
\end{proof}

\begin{lemma}[Arrival time estimates]\label{lem:arrival-time-estimate-simply}
    Assuming \cref{induction:mlp-simply-transitive,induction:phase-II-simply} for \(\psi \in \Sigma^\star\), we have
    \begin{enumerate}[(a)]
        \item \(T_{\psi}^{2,1} - T_{\psi}^1 \leq \tO(1/\eta (d^{c_1}\sigma_0)^{q-2})\);
        \item \(T_{\psi}^{2,2} - T_{\psi}^{2,1} \leq O(\frac{d^{\frac{1}{2}+c_1}}{\eta})\);
        \item \(T_{\psi}^{2,3} - T_{\psi}^{2,2}\leq \tO(\frac{1}{\eta\varpi})\);
        \item \(T_{\psi}^{2} - T_{\psi}^{1}\leq \tO(\frac{1}{\eta (d^{2c_1}\mu)^{q-2}})\)
    \end{enumerate}
    
\end{lemma}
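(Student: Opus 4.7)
}

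I would prove the four bounds in order, treating each sub-phase separately and then deducing (d) by addition. Throughout, I rely on \Cref{induction:mlp-simply-transitive}, \Cref{induction:individual-feature-simply}, and \Cref{induction:phase-II-simply}, together with the gradient machinery in \Cref{fact:grad-computation-simply}, \Cref{lem:grad-proxy-simply}, and \Cref{lem:grad-stationarity-simply}.

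\textbf{Part (a).} For $t\in[T^1_\psi,T^{2,1}_\psi]$ the target feature satisfies $V^{(t)}_\psi\in[d^{c_1}\mu,\frac{1}{2}\log d]$, which I split into a polynomial sub-regime $V^{(t)}_\psi\le\varrho$ and a linear sub-regime $V^{(t)}_\psi\in(\varrho,\frac{1}{2}\log d]$. In the polynomial sub-regime, \Cref{fact:grad-computation-simply}, the confusion-term control from \Cref{lem:phase-I-simply}(c), and the proxy comparison in \Cref{lem:grad-proxy-simply} yield $V^{(t+1)}_\psi\ge V^{(t)}_\psi+\tilde{\Omega}(\eta/n_y^2)\,(V^{(t)}_\psi+\mu)^{q-1}$, so the tensor power method (used as in the proof of \Cref{lem:grad-proxy-simply}) bounds the polynomial-regime escape time by $\tilde{O}\!\left(1/\eta(d^{c_1}\mu)^{q-2}\right)=\tilde{O}\!\left(1/\eta(d^{c_1}\sigma_0)^{q-2}\right)$ after absorbing $\log^{q-2}d$. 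In the linear sub-regime $\mathbf{sReLU}'(\Lambda^{(t)}_{5,j,r})=\Theta(1)$ and $1-\logit^{(t)}_{5,j}\ge 1-1/\sqrt{d}=\Theta(1)$ conditional on $\cH_\phi$, so the gradient is $\Theta(1/n_y^2)$ and the range $[\varrho,\frac{1}{2}\log d]$ is crossed in $\tilde{O}(1/\eta)$ steps, subdominant to the polynomial-regime estimate.

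\textbf{Part (b).} For $t\in[T^{2,1}_\psi,T^{2,2}_\psi]$ the pre-activation conditional on $\cH_\phi$ stays in the linear regime, and by \Cref{induction:phase-II-simply}(a)--(d) the correct-class logit $p_t:=\logit^{(t)}_{5,j}\big|_{\cH_\phi}$ dominates the update. Using \Cref{fact:grad-computation-simply} and setting $u_t:=V^{(t)}_\psi-\log d$, the leading-order update reduces to
\[
u_{t+1}-u_t=\Theta(\eta/n_y^2)\,(1-p_t),\qquad 1-p_t\;\asymp\;\min\{1,\,e^{-u_t}\}.
\]
Integrating the ODE analog gives $e^{u_{t+1}}-e^{u_t}\asymp\eta/n_y^2$ once $u_t\ge 0$, so $e^{u_t}$ grows linearly with slope $\Theta(\eta/n_y^2)$. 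Reaching $1-p_t\le 1/\sqrt{d}$ requires $e^{u_t}\ge\sqrt{d}$, which happens in $O(\sqrt{d}\,n_y^2/\eta)=O(d^{1/2+c_1}/\eta)$ steps after absorbing the $n_y^2=\polylog d$ factor into $d^{c_1}$; the preceding warm-up $u_t\in[-\tfrac{1}{2}\log d,0]$ takes only $\tilde{O}(1/\eta)$ steps and is subdominant.

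\textbf{Part (c).} This follows immediately from \Cref{lem:grad-stationarity-simply}. Choose $\delta=n_y^2\varpi$, which meets the hypothesis $\delta\gg n_y\varpi$. Then items (a)--(b) of \Cref{lem:grad-stationarity-simply} bound the number of iterations in $[T^{2,1}_\psi,T^2_\psi]$ on which $\cC^+_\psi(\delta)$ or $\cC^-_\psi(\delta)$ fails by $O(n_y\log^3 d/(\eta\delta))=\tilde{O}(1/\eta\varpi)$, and item (c) of that lemma shows $\cF_\psi(\delta_1,\delta_2)$ with $\delta_1=d^{c_1}\mu$, $\delta_2=(n_y^3 d\varpi/\lambda)^{1/(q-1)}$ is achieved as soon as both gradient-stationarity conditions hold simultaneously. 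This gives $T^{2,3}_\psi-T^{2,2}_\psi\le\tilde{O}(1/\eta\varpi)$, and the definition $T^2_\psi=\max\{T^{2,3}_\psi,T^{2,2}_\psi+\tilde{\Omega}(1/\eta(d^{c_1}\mu)^{q-2})\}$ gives the stated bound.

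\textbf{Part (d).} Summing (a), (b), (c) and absorbing $\polylog d$ factors, the total $T^2_\psi-T^1_\psi$ is dominated by the polynomial-regime escape time, which under the parameter choices $c_1=1/1000$, $\varpi\in(d^2\mu^{q-1},\lambda/d^{q/3})$, and $q=O(1)$ can be consolidated into the claimed $\tilde{O}(1/\eta(d^{2c_1}\mu)^{q-2})$ by padding the base $d^{c_1}\mu$ up to $d^{2c_1}\mu$ (this slack absorbs the polylog overhead and the subdominant bounds from (b) and (c)).

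\textbf{Main obstacle.} The most delicate step is part (b): I must verify that throughout the logit-saturation transition, the pre-activation $\Lambda^{(t)}_{5,j,r}$ conditional on $\cH_\phi$ stays inside the linear regime $[\varrho,B]$ so that $\mathbf{sReLU}'\equiv 1$, and that the negative contributions from $\cH^\dagger_\phi$ (which have magnitude $O(\varpi)$ by \Cref{induction:phase-II-simply}(c)) do not disturb the logistic-type effective dynamics. This amounts to invoking \Cref{induction:individual-feature-simply}(c)--(d) to bound the growth of competing features $V_{j,r}(g',y)$ and $V_{j,r}(g,y')$ uniformly over the $O(d^{1/2+c_1}/\eta)$ iterations of Phase II.2, and careful bookkeeping of the fact that the expectation $\E[(1-\logit_{5,j})\mathds{1}_{\cH_\phi}]$ is indeed well-approximated by $(1/n_y^2)(1-p_t)$ up to terms absorbable into $\tilde{O}$.
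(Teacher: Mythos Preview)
Your proof is largely correct and matches the paper on (a) and (c). The interesting divergence is part (b): the paper does \emph{not} run a direct logistic-type analysis but instead invokes \Cref{lem:grad-stationarity-simply}(a) with the choice $\delta = d^{-(1/2+c_1)}$. Since in the linear regime $\Gamma_\psi^{+,(t)} = \E[(1-\logit_{5,j})\1_{\cH_\phi}]$, the condition $\cC_\psi^+(\delta)$ is equivalent to the defining condition of $T_\psi^{2,2}$, and the stationarity lemma bounds $|\cB_\delta^+| \le O(n_y\log^2 d/(\eta\delta)) = O(d^{1/2+c_1}/\eta)$ directly. Your ODE approach is valid in principle and more elementary, but it requires you to verify carefully that the negative gradient terms $\Gamma^{-}_{j,r,v}$ are dominated throughout Phase~II.2; your claim that they are ``$O(\varpi)$ by \Cref{induction:phase-II-simply}(c)'' is not quite right, since that item bounds feature \emph{values}, not gradients, and on $\cH_\phi^\dagger$ the pre-activation can be in the linear regime with $\ReLU'=1$. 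The paper's route avoids this bookkeeping by absorbing it into the stationarity argument already proved.

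On part (d), your ``padding'' step has the inequality backwards: enlarging the base from $d^{c_1}\mu$ to $d^{2c_1}\mu$ makes $1/(\,\cdot\,)^{q-2}$ \emph{smaller}, so you cannot absorb slack this way. In fact the statement of (d) as written is inconsistent with (a) (since $T_\psi^2 - T_\psi^1 \ge T_\psi^{2,1} - T_\psi^1$ but (d) claims a strictly smaller bound), and the paper's own proof ``follow the definition of $T_\psi^2$'' would yield $\tilde{O}(1/\eta(d^{c_1}\mu)^{q-2})$, matching (a). So the $d^{2c_1}$ in the lemma is almost certainly a typo, and your summation argument is correct once you drop the padding claim and report the bound with $d^{c_1}$.
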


\begin{proof}
    \begin{itemize}
        \item Proof of (a): Once \(V_{\psi}^{(t)} \geq d^{c_1}\sigma_0\), we can apply \cref{lem:TPM} to compute the number of iteration needed to reach \(V_\psi^{(t)} \geq \frac{1}{2}\log d\). In fact, by \cref{induction:phase-II-simply}a-b we know that 
        \begin{align*}
            V_\psi^{(t+1)} \geq V_\psi^{(t)} + \eta (1 - O(\frac{1}{\sqrt{d}}) ) \ReLU'( V_\psi^{(t)} + O(\mu)) \geq  V_\psi^{(t)} + \Omega(\eta(V_\psi^{(t)} )^{q-1})
        \end{align*}
        Since \(V_\psi^{(T_{\psi}^1)}\geq d^{c_1}\sigma_0\), we have by \cref{lem:TPM} that \(T_{\psi}^{2,1} - T_{\psi}^1 \leq O(\frac{1}{\eta(d^{c_1}\sigma_0)^{q-2}})\).
        \item Proof of (b): Simply apply \cref{lem:grad-stationarity-simply} with \(\delta = \frac{1}{d^{\frac{1}{2}+c_1}}\);
        \item Proof of (c): Again by applying \cref{lem:grad-stationarity-simply} with \(\delta = n_y^2\varpi\);
        \item Proof of (d): Follow the definition of \(T_{\psi}^2\).
    \end{itemize}
    
\end{proof}

\begin{proof}[Proof of \cref{induction:phase-II-simply}]
    The proof of time steps are all in \cref{lem:arrival-time-estimate-simply} and \cref{lem:grad-stationarity-simply}. We do not repeat here.
\end{proof}

\subsection{Phase III: Convergence}

We present the induction hypothesis in this phase.

\begin{induction}[Phase III, final]\label{induction:phase-III-simply}
    Let \(\psi = (j,r,\phi) \in \Sigma^\star\), for \(t \in [T_{\psi}^{2}, T_1]\), the following holds:
    \begin{enumerate}[(a)]
        \item At \(t \in [T_\psi^2, T_{1,1}]\), we have \(\cF_{\psi}(\delta_1, \delta_2)\) holds with \(\delta_1 = d^{c_1}\mu\), \(\delta_2 = (\frac{n_y^2d\varpi}{\lambda})^{\frac{1}{q-1}}\);
        \item At \(t \in [T_{\psi}^3, T_1]\), we have \(\cF_{\psi}(\delta_1, \delta_2)\) holds with \(\delta_1 = d^{c_1}\mu\), \(\delta_2 = (n_y^2\varpi)^{\frac{1}{q-1}}\).
    \end{enumerate}
\end{induction}

We need a lemma to describe the logit distribution when all features are learned.
\begin{lemma}[Logit shape at convergence]\label{lem:logit-shape-at-convergence}
    Assuming \cref{induction:mlp-simply-transitive,induction:phase-III-simply}. Let \(\psi \in \Sigma^\star\) be the last of \(\Sigma^\star\), then at \(t = T_{\psi}^2\), the followings hold:
    \begin{enumerate}
        \item For \(\phi \in \Phi^\star_j\), \(\logit_{5,j}^{(t)} \geq 1 - O(\lambda) \) conditioned on \(\cH_\phi\).
        \item For \(\phi \notin \Phi^\star_j\), \(\logit_{5,j}^{(t)} \leq O(\lambda /d) \) conditioned on \(\cH_\phi\).
    \end{enumerate}
\end{lemma}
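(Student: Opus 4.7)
The plan is to combine the convergence shapes $\cF_\psi$ guaranteed by \Cref{induction:phase-III-simply} at $t=T_\psi^2$ with the pre-activation decomposition $\Lambda_{5,j,r}=V_{j,r}(\phi)+O(\mu)$ (where the $O(\mu)$ absorbs the $p\in\{1,3,4\}$ slots frozen by \Cref{assump:no-learning-x} together with the bias) in order to read off the unnormalized scores $[F_5(\Zb)]_j$ class by class, and then push them through the softmax.

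First I would fix an input $\Zb$ with content pair $\phi=(g_1,y_0)$ and let $j^\star=\tau(g_1(y_0))$ be the correct class. For each class $j$, partition its $m$ neurons into three disjoint groups using the learning curriculum: the neuron (if any) whose assigned combination equals $\phi$, the neurons whose assigned combinations live in $\Phi_j^\star$ but differ from $\phi$, and the unused neurons. At $j=j^\star$ there is exactly one winning neuron $r_{g_1\cdot y_0}$ with $V\approx B$; by simply transitivity every other $\phi'\in\Phi_{j^\star}^\star$ must differ from $\phi$ in both components, so the remaining $n_y-1$ assigned neurons fall into $\Sigma^{\dagger,2}$ with $V\le -B+O(\delta_1)$ (producing only the $\varpi B$ leakage of $\mathbf{sReLU}$), while the $m-n_y$ unused neurons remain at $\tO(\sigma_0)$ by the competition \Cref{lem:competition-simply}. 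Summing yields $[F_5]_{j^\star}=B\pm o(1)$. For an incorrect class $j'\in\tau(\cY)\setminus\{j^\star\}$, simply transitivity instead produces exactly two neurons whose assigned combinations share a single component with $\phi$ (placing $\phi$ in $\Sigma^{\dagger,1}$ with $|V|\le\tO(\delta_2)$), $n_y-2$ in $\Sigma^{\dagger,2}$, and the same unused pool, yielding $[F_5]_{j'}\le O(\delta_2^q/\varrho^{q-1})+O(n_y\varpi B)+o(1)=o(1)$. Classes $j''\notin\tau(\cY)$ never enter the curriculum, so $[F_5]_{j''}=o(1)$ by the same bookkeeping applied to initialization-scale features.

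Feeding these into the softmax then closes the argument: the correct-class numerator is $e^{B\pm o(1)}$ while each of the other $d-1$ numerators is $e^{o(1)}$, so $\logit_{5,j^\star}\ge 1/(1+O(d)e^{-B})=1-O(\lambda)$ using $\lambda=(d-1)/(d-1+e^B)$, and symmetrically $\logit_{5,j'}\le O(1)/(e^B+d)=O(\lambda/d)$ for any $\phi\notin\Phi_{j'}^\star$. The main obstacle I expect is the bookkeeping at incorrect classes: one must verify that the threshold $\delta_2=(n_y^2 d\varpi/\lambda)^{1/(q-1)}$ is tight enough for $m\cdot\mathbf{sReLU}(\tO(\delta_2))$ to remain $o(1)$, which reduces to tracing the assumed range $\varpi\in(d^2\mu^{q-1},\lambda/d^{q/3})$ from \Cref{assump:modify-relu} through the polynomial branch $x^q/(\varrho^{q-1}q)$ of the smoothed ReLU. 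A secondary nuance is that the $m-n_y$ unused neurons are controlled only via the Phase~I competition lemma rather than an explicit $\cF_\psi$ shape, but the crude $\tO(\sigma_0)$ bound still suffices once passed through $\mathbf{sReLU}$ and summed over $m=\polylog d$ neurons.
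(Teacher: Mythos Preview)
Your proposal is correct and follows essentially the same approach as the paper: use the feature shapes $\cF_\psi$ (guaranteed for all $\psi\in\Sigma^\star$ at $t=T_\psi^2$ once the last $\psi$ finishes) to conclude $[F_5]_{j^\star}=B\pm o(1)$ and $[F_5]_{j'}=o(1)$, then push through the softmax using $\lambda=(d-1)/(d-1+e^B)$. The paper's proof is in fact terser than your plan --- it writes the two logit computations in one line each without the neuron-by-neuron partition you outline --- so your version is a more explicit rendition of the same argument, and the two ``obstacles'' you flag (the $\delta_2$ bookkeeping and the unused neurons) are precisely the details the paper leaves implicit under its $o(1)$.
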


\begin{proof}
    Since \(\psi\) is the last feature in the learning curriculum \(\Sigma^\star\), all feature at \(t = T_{\psi}^2\) has satisfied \(\cF_{\psi}(\delta_1,\delta_2)\) for \(\delta_1 = d^{c_1}\mu\), \(\delta_2 = (\frac{n_y^2d\varpi}{\lambda})^{\frac{1}{q-1}}\). Thus we can bound the logit by 
    \begin{align*}
        \logit_{5,j}^{(t)} = \frac{e^{F^{(t)}_{5,j}(\Zb) }}{e^{F^{(t)}_{5,j}(\Zb) }+ d} = \frac{e^{B - o(1)}}{e^{B - o(1)} + d} = 1 - O(\lambda)\tag{conditioned on \(\Zb \in \cH_\phi, \phi \in \Phi^\star_j\)}
    \end{align*}
    and 
    \begin{align*}
        \logit_{5,j}^{(t)} = \frac{e^{F^{(t)}_{5,j}(\Zb) }}{e^{F^{(t)}_{5,j}(\Zb) }+ d} = \frac{e^{o(1)}}{e^{B - o(1)} + d-1 + e^{o(1)}} =  O(\frac{\lambda}{d})\tag{conditioned on \(\Zb \in \cH_\phi, \phi \in \Phi^\star_j\)}
    \end{align*}
    which is the desired result.
\end{proof}

\begin{lemma}[Gradient Bounds, Phase III]\label{lem:grad-bound-phase-3-simply}
    Assuming \cref{induction:mlp-simply-transitive,induction:phase-III-simply} holds, and let \(\psi = (j,r,\phi)\in\Sigma^\star\), then at \(t\in [T_{1,1}, T_1]\), the followings hold
    \begin{enumerate}[(a)]
        \item \(\Gamma_{\psi}^{+,(t)}\geq \Omega(\lambda)\) when \(V_\psi^{(t)} \leq B - d^{c_1}\mu\);
        \item \(|\Gamma_{j,r,v}^{-,(t)}| \leq O(\frac{\lambda \varpi}{d n_y}) \) for any \(v \in \cG\cup\cY\);
        \item For \(\phi'\in \Phi^\star_j \setminus \{\phi\}\), we have
        \begin{align*}
            \Gamma_{j,r,\phi'}^{+,(t)} = 
            \begin{cases}
                -\Theta(\lambda\varpi/n_y^2), & \text{if } V_{j,r,\phi'}^{(t)} \in [-B + 2\mu, -\varpi] \\
                \in [-\Theta(\lambda\varpi/n_y^2), -\Theta(\lambda\varpi/dn_y^2)] \cup \{0\}, & \text{if } V_{j,r,\phi'}^{(t)} \in [-B - 2\mu, -B + 2\mu]
            \end{cases}
        \end{align*}
    \end{enumerate}
\end{lemma}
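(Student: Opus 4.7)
}
My strategy is to leverage the converged feature shape $\cF_{\psi}(\delta_1,\delta_2)$ guaranteed by \Cref{induction:phase-III-simply} together with the logit estimates from \Cref{lem:logit-shape-at-convergence}, and then simply compute each of the three gradient quantities by expanding their definition in \eqref{eqdef:Gamma-notation-gy} and conditioning on the relevant combination events. Throughout, the main quantitative knobs are (i) the magnitude of $1-\logit_{5,j}$ versus $\logit_{5,j}$ on the various events, which is controlled by the logit floor/ceiling in \Cref{fact:logit-lower-bound}, and (ii) the regime of $\mathbf{sReLU}'$ hit by the pre-activation $\Lambda_{5,j,r}$, which depends only on $V_{j,r}(g)+V_{j,r}(y)+O(\mu)$ by \Cref{assump:no-learning-x} and the modified activation in \Cref{assump:modify-relu}.

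For (a), I would proceed as follows. Conditioned on $\cH_\phi$, \Cref{induction:phase-III-simply} together with $V_\psi^{(t)}\le B-d^{c_1}\mu$ implies $F_{5,j}(\Zb)\le V_\psi^{(t)}+O(\delta_1)+O(m\varrho)\le B-\Omega(d^{c_1}\mu)$, while $F_{5,j'}(\Zb)\le \tO(\delta_2)$ for $j'\neq j$ because of $\cF_{\psi',3}$ and $\cF_{\psi',4}$ for all other $\psi'\in\Sigma^\star$. Plugging into the softmax and using $\lambda=\frac{d-1}{d-1+e^B}$, this gives $1-\logit_{5,j}\ge \Omega(\lambda)$. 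Since $V_\psi^{(t)}\ge 0$ in Phase III we have $\Lambda_{5,j,r}\ge \varrho$ and hence $\mathbf{sReLU}'(\Lambda_{5,j,r})=1$, and $\Pr(\cH_\phi)=\Theta(1/n_y^2)=1/\polylog d$. Multiplying yields $\Gamma_\psi^{+,(t)}\ge \Omega(\lambda)$ (the small $1/n_y^2$ factor is absorbed into the implicit constants of $\lambda=\Theta(d^{1-C})$ because $C$ can be chosen large; this is the only delicate bookkeeping and is the most likely place to trip up).

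For (b), I split $\cH_\phi^\dagger$ into $\cH^\dagger_{\phi,1}$ and $\cH^\dagger_{\phi,2}$ and examine each. On these events, the pre-activation reduces to $\tfrac12 V_{j,r}(g')+\tfrac12 V_{j,r}(y)$ or $\tfrac12 V_{j,r}(g)+\tfrac12 V_{j,r}(y')$, and $\cF_{\psi,2}$ guarantees $V_{j,r}(g)+V_{j,r}(y')$, $V_{j,r}(g')+V_{j,r}(y)\le O(\delta_2)\le \varrho$, which puts $\Lambda_{5,j,r}$ in the flat-left $[-B,-\varrho]$ regime where $|\mathbf{sReLU}'|\le \varpi$. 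Meanwhile, by \Cref{lem:logit-shape-at-convergence} and the convergence of all features, $\logit_{5,j}\le O(\lambda/d)$ on these events, and the probability of hitting a particular $v\in\cG\cup\cY$ in this incorrect configuration is $\Theta(1/n_y)$. Multiplying gives the claimed $O(\lambda\varpi/(dn_y))$ bound, and the same estimate holds on $\cH^\dagger_{\phi,1}$ and $\cH^\dagger_{\phi,2}$ uniformly.

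For (c), the key is that $\phi'=(g',y')\in\Phi^\star_j\setminus\{\phi\}$ means conditioning on $\cH_{\phi'}$ makes \emph{class $j$} correct, so by \Cref{lem:logit-shape-at-convergence} we have $1-\logit_{5,j}\le O(\lambda)$. The distinction between the two sub-cases is then purely about which branch of $\mathbf{sReLU}'$ the pre-activation $\Lambda_{5,j,r}\approx V_{j,r}(\phi')+O(\mu)$ falls into: if $V_{j,r}(\phi')\in[-B+2\mu,-\varrho]$, we hit the strictly linear left branch where $\mathbf{sReLU}'=-\varpi$, producing the sharp $-\Theta(\lambda\varpi/n_y^2)$ identity (the $1/n_y^2$ from $\Pr(\cH_{\phi'})$); if instead $V_{j,r}(\phi')\in[-B-2\mu,-B+2\mu]$, we are straddling the flat-floor transition at $-B$, so $\mathbf{sReLU}'$ may be anywhere in $\{-\varpi,0\}\cup[-\varpi,0]$ depending on the (random) initialization of the non-trained $\Wb_{5,j,r,1},\Wb_{5,j,r,3},\Wb_{5,j,r,4}$ blocks, and the logit may drop further (by up to a factor $1/d$) if $F_{5,j}$ is itself pushed down; pairing these extremes gives the stated window $[-\Theta(\lambda\varpi/n_y^2),-\Theta(\lambda\varpi/(dn_y^2))]\cup\{0\}$.

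\paragraph{Expected main obstacle.} Parts (a) and (b) are essentially bookkeeping once the feature shape is in hand. The delicate step is (c), specifically the boundary case $V_{j,r}(\phi')\in[-B-2\mu,-B+2\mu]$: there the value of $\mathbf{sReLU}'$ is not determined by $V_\psi$ alone but also by the random shifts from the untrained slots and by whether $\Lambda_{5,j,r}$ has been pushed past the flat plateau at $-B$. Making the interval statement tight, in particular producing the $1/d$ factor at the low end and the possibility of exactly $0$, requires carefully tracking the contribution of $F_{5,j}$ under $\cH_{\phi'}$ (which in turn relies on $\cF_{\psi,3}$ for the other $\psi$'s to show that no other class logit becomes dominant) and matching it against the flat-left plateau width $B$. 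I would handle this by splitting $[-B-2\mu,-B+2\mu]$ into the sub-intervals $[-B-2\mu,-B]$ and $[-B,-B+2\mu]$ and treating each with the appropriate branch of $\mathbf{sReLU}'$, which yields the two endpoints of the claimed interval.
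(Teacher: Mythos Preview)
Your overall approach matches the paper's: invoke the feature shapes from \Cref{induction:phase-III-simply}, feed them into \Cref{lem:logit-shape-at-convergence} and \Cref{fact:logit-lower-bound} to pin down the logit factors, and then read off each $\Gamma$ by identifying which branch of the modified $\mathbf{sReLU}'$ the pre-activation $\Lambda_{5,j,r}$ lands in. Two intermediate steps in your write-up are wrong, though.

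\textbf{Part (b), wrong regime.} From $\cF_{\psi,2}$ you get the \emph{two-sided} bound $|V_{j,r}(g,y')|\le O(\delta_2)$, so $\Lambda_{5,j,r}\approx V_{j,r}(g,y')+\mu$ sits in $[-O(\delta_2),O(\delta_2)]$, i.e., near zero. It is \emph{not} in the left-linear segment $[-B,-\varrho]$ as you assert. The conclusion $|\mathbf{sReLU}'|\le \tO(\varpi)$ happens to survive, but for a different reason: on $(0,\varrho]$ the power branch gives $\Lambda^{q-1}/\varrho^{q-1}\le \delta_2^{q-1}/\varrho^{q-1}=\tO(\varpi)$, on $(-\varpi,0]$ the quadratic branch gives $|\Lambda|\le\varpi$, and on $(-O(\delta_2),-\varpi]$ the left-linear branch gives exactly $\varpi$. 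The paper's proof simply refers back to the computation in \Cref{lem:grad-estimate-phase-2-simply}(b) for this estimate.

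\textbf{Part (c), source of the $1/d$.} Your claim that ``the logit may drop further (by up to a factor $1/d$)'' is incorrect. On $\cH_{\phi'}$ with $\phi'\in\Phi_j^\star$, the neuron $r''$ that learned $\phi'$ keeps $F_{5,j}\approx B$ up to $o(\mu)$ fluctuations from the untrained slots, so $1-\logit_{5,j}=\Theta(\lambda)$ uniformly, with no room for a $1/d$ swing. The paper's proof attributes the boundary behavior entirely to ``the property of $\mathbf{sReLU}'$ at the negative boundary when some activations of $x_0,x_1$ crosses the boundary.'' Concretely, the extra $1/d$ lives in $\E_{x_0,x_1}[\mathbf{sReLU}'(\Lambda_{5,j,r})\mid\cH_{\phi'}]$: when only an $O(1/d)$ fraction of $(x_0,x_1)$ draws push $\Lambda_{5,j,r}$ above $-B$ you get $-\Theta(\varpi/d)$ for this factor, and when none do you get exactly $0$. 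Your plan to split $[-B-2\mu,-B+2\mu]$ at $-B$ is the right mechanical step, but track the crossing probability over $(x_0,x_1)$ rather than any putative movement in the logit.
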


\begin{proof}
    \cref{lem:grad-bound-phase-3-simply}a is simple when \cref{lem:logit-shape-at-convergence}a holds. \cref{lem:grad-bound-phase-3-simply}b is basically redoing the calculations in \cref{lem:grad-estimate-phase-2-simply}. \cref{lem:grad-bound-phase-3-simply}c is because when \(V_{j,r,\phi'}^{(t)} \geq -B + 2\mu\), we have 
    \begin{align*}
        \E[\cE_{5,j}^{(t)}\ReLU'(\Lambda_{5,j,r})\1_{\cH_\phi'}] &= \E[(1 - \logit_{5,j}^{(t)})\ReLU'(\Lambda_{5,j,r}^{(t)})\1_{\cH_\phi'}] \\
        & \geq \lambda \ReLU'(V_{j,r,\phi'}^{(t)} + \mu \pm o(\mu)) \Pr(\cH_{\phi'})\\
        & \geq -\Theta(\lambda\varpi/n_y^2)
    \end{align*}
    And the rest is simply the property of \(\ReLU'\) at the negative boundary when some activations of \(x_0, x_1\) crosses the boundary.
\end{proof}

\subsubsection{Proof of Induction in Phase III and \cref{thm:mlp-simply-transitive}}

\begin{proof}[Proof of \cref{induction:phase-III-simply,induction:mlp-simply-transitive,induction:individual-feature-simply}]
    We prove the induction by reusing the proof of \cref{lem:grad-stationarity-simply}. 
    \begin{itemize}
        \item Proof of \cref{induction:phase-III-simply}a: The same as in the proof of \cref{lem:grad-stationarity-simply}c, with longer time. We do not use any property of \(T_{\psi}^3\) so no additional handling is needed.
        \item Proof of \cref{induction:phase-III-simply}b: When all feature is learned, that is, after \(t = T_{\psi}^2\) for the last feature \(\psi \in \Sigma^\star\), we have perfect feature shape and therefore have much better logit shape as described by \cref{lem:logit-shape-at-convergence}. We can reuse the proof of \cref{lem:grad-stationarity-simply}a-b to get the following result: for any \(\delta = \omega(\frac{\lambda\varpi}{d n_y})\), we can guarantee that \(\cC_\psi(\delta)\) doesn't hold for more than \(\tO(\frac{1}{\eta \delta})\) iterations. And beyond that, we can just reuse the argument in the proof of \cref{lem:grad-stationarity-simply}c to obtain the desired result.
        \item Proof of \cref{induction:mlp-simply-transitive}a-b: They are correct in Phase III because there is no violation.
        \item Proof of \cref{induction:individual-feature-simply}a-b: They are proven in previous phases.
        \item Proof of \cref{induction:individual-feature-simply}c: For predecessor feature \(\psi' \prec \psi \in \Sigma^\star\), we proved in this phase that they maintain feature shape until \(T_{\psi'}^3\), and improved their feature at \(T_{\psi'}^3\).
        \item Proof of \cref{induction:individual-feature-simply}d: This is proven in previous phases.
    \end{itemize}
\end{proof}

\begin{proof}[Proof of \cref{thm:mlp-simply-transitive}]
    Since all feature in \(\Sigma^\star\) have shape \(\cF_{\psi}(\delta_1, \delta_2)\) for \(\delta_1 = d^{c_1}\mu, \delta_2 = \varpi^{\frac{1}{q-1}}\) at the end of \cref{induction:phase-III-simply}, we have proven \cref{thm:mlp-simply-transitive}.
\end{proof}

\subsection{Auxiliary Technical Tools}

First we need a Bernstein inequality for U-statistics
\begin{lemma}[concentration inequality for pseudo-U-statistics]\label{lem:bernstein-u-stats}
    Let \(x_1,\dots, x_n\) be different symbols, and let \(m \ll n\) be such that \( n \equiv 0 \pmod n \). Suppose for some function \(h\) with \(|h| \leq M\) the random variables \(h(x_{i_1},x_{i_2},\dots,x_{i_m}) \) and \(h(x_{i'_1},x_{i'_2},\dots,x_{i'_m})\) are independent and identically distributed as long as \(\{x_{i_1},x_{i_2},\dots,x_{i_m}\}\cap \{x_{i'_1},x_{i'_2},\dots,x_{i'_m}\} = \emptyset \), then the pseudo-U-statistic
    \begin{align*}
        U_{m,n} = \frac{1}{\binom{n}{m}}\sum_{0\leq i_1<i_2<\cdots<i_m \leq n} h(x_{i_1},x_{i_2},\dots,x_{i_m})
    \end{align*}
    satisfies \(\Pr( |U_{n,m} - \E[U_{n,m}] | \geq t ) \leq e^{ - \frac{ nt^2}{mM^2}}\)
\end{lemma}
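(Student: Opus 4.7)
}
The plan is to reduce the pseudo-U-statistic to an average of sums of i.i.d.\ bounded variables via Hoeffding's decomposition trick, and then apply a standard sub-Gaussian concentration bound. Concretely, assume without loss of generality that $m \mid n$ and write $k = n/m$. For any permutation $\pi$ of $[n]$, define the block average
\begin{equation*}
V_\pi \;:=\; \frac{1}{k}\sum_{j=0}^{k-1} h\!\left(x_{\pi(jm+1)}, x_{\pi(jm+2)}, \dots, x_{\pi(jm+m)}\right).
\end{equation*}
Because the $k$ blocks of indices used in $V_\pi$ are pairwise disjoint, the hypothesis of the lemma guarantees that the $k$ summands are mutually independent and bounded in absolute value by $M$. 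By the symmetry of $U_{n,m}$ under relabeling of indices, one checks that $U_{n,m} \;=\; \E_{\pi}[V_\pi]$, where $\pi$ is drawn uniformly from the symmetric group $S_n$, and moreover $\E[V_\pi] = \E[U_{n,m}]$ for every $\pi$.

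Next, I would push the deviation through the permutation average using convexity. For any $\lambda \in \mathbb{R}$,
\begin{equation*}
\E\!\left[\exp\!\big(\lambda(U_{n,m} - \E U_{n,m})\big)\right]
\;=\; \E\!\left[\exp\!\big(\lambda \E_\pi[V_\pi - \E V_\pi]\big)\right]
\;\le\; \E_\pi \E\!\left[\exp\!\big(\lambda (V_\pi - \E V_\pi)\big)\right],
\end{equation*}
by Jensen's inequality applied to the convex function $\exp(\cdot)$. For each fixed $\pi$, $V_\pi$ is an average of $k$ independent random variables each taking values in $[-M,M]$, so by Hoeffding's lemma the inner MGF is bounded by $\exp(\lambda^2 M^2/(2k))$. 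Combining, $U_{n,m} - \E U_{n,m}$ is sub-Gaussian with proxy $M^2/k$, and Markov's inequality with the optimal choice $\lambda = kt/M^2$ yields the two-sided bound
\begin{equation*}
\Pr(|U_{n,m} - \E U_{n,m}| \ge t) \;\le\; 2\exp\!\left(-\tfrac{kt^2}{2M^2}\right) \;=\; 2\exp\!\left(-\tfrac{n t^2}{2 m M^2}\right),
\end{equation*}
which matches the stated rate (up to constants absorbed in the exponent).

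The only delicate point is the \emph{pseudo} qualifier: standard U-statistic proofs use that $h(x_{i_1},\dots,x_{i_m})$ is a deterministic function of i.i.d.\ observations, which automatically makes disjoint blocks independent. Here we instead invoke the lemma's hypothesis directly at the one place it is needed, namely to conclude that the $k$ summands of $V_\pi$ are independent for every fixed $\pi$. This is the main (and essentially only) technical subtlety; once granted, the Hoeffding decomposition and Jensen step are routine, and no further structural properties of the underlying $x_i$ (such as exchangeability or joint distributional assumptions) are required. If one wants the sharper constant in the exponent, one can replace Hoeffding's lemma with a Bernstein-type bound using a variance proxy, but this is not needed for the stated inequality.
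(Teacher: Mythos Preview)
Your proposal is correct and is precisely the standard Hoeffding-decomposition argument for U-statistic concentration; the paper does not spell out a proof but simply cites lecture notes \cite{Rinaldo2016Lecture27}, which contain exactly this argument. Your identification of the ``pseudo'' qualifier as merely replacing the automatic disjoint-block independence by the stated hypothesis is on the mark.
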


\begin{proof}
    The proof is the same as in \cite{Rinaldo2016Lecture27}.
\end{proof}

We present two lemmas related to the tensor power method. 

\begin{lemma}[TPM in expectation]\label{lem:tpm-in-expectation}
    Let \(p > 3\) be an integer, and \(\mu, \sigma >0\) satisfying \(\mu \gg \sigma\sqrt{\log d}\). Now suppose \(\xi \sim \cN(0,\sigma)\) is a Gaussian variable and there are two sequences \(x_t, y_t\) defined by the following update rules:
    \begin{itemize}
        \item \(x_{t+1} \geq x_t + \eta C_t \E_{\xi}[(\xi + x_t)^{p}]\) for some \(C_t =\Theta(1)\);
        \item \(y_{t+1} \leq y_t + \eta C_t \E_{\xi}[(\xi + y_t)^{p}]\) for some \(S=\Theta(1)\).
    \end{itemize}
    Then if \(x_0, y_0 \geq \mu, x_0 - y_0 \geq \varepsilon = \Omega(\frac{1}{\polylog(d)})\), for every \(t \geq 0\) such that \(x_t \in [ d^{0.01}x_0, O(1)]\), we shall have \(y_t \leq O(\polylog(d))\).
\end{lemma}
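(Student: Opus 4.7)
\textbf{The plan} is to compare both sequences with the deterministic polynomial dynamics $z_{t+1} = z_t + \eta C_t z_t^p$ and invoke the blow-up-time structure of the continuous ODE $\dot z = C z^p$ to amplify the additive initial gap $\varepsilon = \Omega(1/\polylog(d))$ into a multiplicative gap of order $d^{\Omega(1)}$ by the time $x_t$ reaches $d^{0.01} x_0$.

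\textbf{First I would remove the Gaussian noise.} Under $\mu \gg \sigma\sqrt{\log d}$ and $x_t,y_t \geq \mu$, the even-moment expansion
\begin{align*}
\E_{\xi}[(\xi + z)^p] = \sum_{k=0}^{\lfloor p/2\rfloor}\binom{p}{2k}(2k-1)!!\,\sigma^{2k} z^{p-2k} = z^p\bigl(1 + O(p^2\sigma^2/z^2)\bigr) = z^p\bigl(1\pm o(1)\bigr)
\end{align*}
holds uniformly along the trajectory, since $\sigma^2/\mu^2 \ll 1/\log d$. So the update rules reduce, up to $(1\pm o(1))$ multiplicative error, to $x_{t+1} \geq x_t + \eta C_t x_t^{p}(1-o(1))$ and $y_{t+1} \leq y_t + \eta C_t y_t^{p}(1+o(1))$.

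\textbf{Second, I would translate the discrete inequalities into blow-up-time comparisons.} Rewriting as $x_{t+1}^{1-p} \leq x_t^{1-p} - (p-1)\eta C_t(1-o(1))$ (a consequence of Bernoulli's inequality since the step is small relative to $x_t$ when $\eta$ is polynomially tiny), and summing until the first hitting time $T_x := \min\{t : x_t \geq d^{0.01}x_0\}$, gives
\begin{align*}
\sum_{s < T_x}\eta C_s \geq \frac{(1-o(1))\bigl(x_0^{1-p} - (d^{0.01}x_0)^{1-p}\bigr)}{p-1} = \frac{(1-o(1))x_0^{1-p}\bigl(1-d^{-0.01(p-1)}\bigr)}{p-1}.
\end{align*}
Applying the analogous reverse Bernoulli step to the $y$-update yields $y_{T_x}^{1-p} \geq y_0^{1-p} - (1+o(1))\sum_{s<T_x}\eta C_s$, and a first-order Taylor expansion of $(x_0-\varepsilon)^{1-p}$ gives $y_0^{1-p} = x_0^{1-p}\bigl(1 + (p-1)\varepsilon/x_0 + O(\varepsilon^2/x_0^2)\bigr)$. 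Combining,
\begin{align*}
y_{T_x}^{1-p} \geq (1-o(1))\,x_0^{1-p}\cdot\Bigl(\frac{(p-1)\varepsilon}{x_0} + d^{-0.01(p-1)}\Bigr).
\end{align*}
Since $\varepsilon = \Omega(1/\polylog(d))$ dominates $x_0 d^{-0.01(p-1)}$, inverting yields $y_{T_x} \leq \bigl(x_0^{p}/((p-1)\varepsilon)\bigr)^{1/(p-1)} = O(\polylog(d))$ given $x_0 \leq O(1)$; monotonicity of $y_t$ then extends the bound to every $t$ with $x_t \in [d^{0.01}x_0, O(1)]$.

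\textbf{The main obstacle} will be maintaining both the Gaussian reduction and the Bernoulli-based discretization uniformly across the trajectory, especially near the upper end where $x_t = \Theta(1)$ and each step increment $\eta C_t x_t^{p}$ is no longer negligible compared to $x_t$. I would handle this by freezing the argument at an intermediate scale $x_t = \Theta(\eta^{1/(1-p)})$ (where the linearization $(1+u)^{1-p} \approx 1 - (p-1)u$ is still tight to leading order), verifying the polylog bound on $y$ there, and then arguing that $y$ can change by at most a constant factor during the remaining $O(1)$-many steps $x$ needs to reach $O(1)$, because $y$ itself is still bounded by a polylog quantity and its own increments are $\eta C_t y_t^p \ll 1$. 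A secondary subtlety is that $C_t$ and the coefficient $S$ appearing in the $y$-update need not be identical; my comparison only uses the summability $\sum_t \eta C_t$ common to both sides up to constants, which is preserved as long as the ratio $S/C_t$ is bounded.
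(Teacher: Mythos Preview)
Your blow-up-time approach is clean for the noiseless dynamics, but the Gaussian reduction in your first step loses too much precision for the comparison to go through. The problem is not at the high end (which you flag as the obstacle) but at the low end $z\approx\mu$: the relative error in $\E_\xi[(\xi+z)^p]=z^p(1+O(\sigma^2/z^2))$ is $O(\sigma^2/\mu^2)$, while the relative initial gap is only $\varepsilon/x_0$. In the intended regime one has $x_0\approx\mu=\sigma\log d$ and $\varepsilon=\Omega(\sigma/\polylog(d))$ (the stated $\varepsilon=\Omega(1/\polylog(d))$ is a typo in the lemma; otherwise $y_0<0$), so $\varepsilon/x_0=O(1/(\log d\cdot\polylog(d)))$ whereas $\sigma^2/\mu^2=1/\log^2 d$. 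When you form $y_0^{1-p}-(1+o(1))\sum\eta C_s$ and substitute the bound on $\sum\eta C_s$ from the $x$-trajectory, the $o(1)\cdot x_0^{1-p}$ error term has order $(\sigma^2/\mu^2)\,x_0^{1-p}$, which swamps the useful difference $(p-1)\varepsilon\,x_0^{-p}$; the resulting lower bound on $y_{T_x}^{1-p}$ can be negative and says nothing. (Separately, your displayed inequality on $\sum_{s<T_x}\eta C_s$ has the wrong direction---you need an upper bound to feed into the $y$-estimate---but that is just a sign slip.)

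The paper sidesteps this cancellation with a two-stage argument. First it tracks $\varepsilon_t=x_t-y_t$ \emph{directly}: since both updates use the same function $z\mapsto\E_\xi[(\xi+z)^p]$, the difference $\E_\xi[(\xi+x_t)^p]-\E_\xi[(\xi+y_t)^p]$ is bounded below by $\varepsilon_t\cdot\E_\xi[(\xi+y_t)^{p-1}]$, and the Gaussian moments essentially cancel rather than accumulate. This yields a multiplicative amplification $\varepsilon_{T_{i+1}}\geq(1+\Theta(\delta))\varepsilon_{T_i}$ across dyadic levels of $x_t$, driving the gap up to $\Omega(\mu)$. Only then, once $\varepsilon_t\gg\sigma\sqrt{\log d}$, does the paper pass to a surrogate deterministic sequence and invoke the standard TPM corollary---which is effectively your approach, but now applied with a $\Theta(1)$ relative gap so the $(1\pm o(1))$ slack is harmless. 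The missing ingredient in your plan is this first-stage coupling; replacing each trajectory separately by its noiseless version decouples them and the tiny initial gap is washed out by the moment error.
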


\begin{proof}
    Let \(\delta > 0\) be such that \(\delta \gg \frac{\mu}{\sigma}\) and \(\delta = o(1)\). Let \(T_i, i\geq 0\) be defined as the first time when \(x_t \geq (1+\delta)^i x_0\), and \(b := \min \{i |(1 + \delta)^i x_0 \geq A\}\). Now we can compute a growth lower bound for \(\varepsilon_t := x_t - y_t\) at each step:
    \begin{align*}
        \varepsilon_{T_{i+1}} & \geq \varepsilon_{T_i} + \sum_{t \in [T_{i}, T_{i+1})}\eta C_t \E_{\xi}[(\xi + x_t)^{p} - (\xi + y_t)^{p}] \\
        & \geq \varepsilon_{T_{i}} + \varepsilon_{T_{i}} \sum_{t \in [T_{i}, T_{i+1})} \eta C_t \E_{\xi}[(\xi + x_t)^{p-1}] \\
        & \geq \varepsilon_{T_{i}} + \varepsilon_{T_{i}} \sum_{t \in [T_{i}, T_{i+1})} \eta C_t \E_{\xi}[(\xi + x_t)^{p}] \cdot \frac{1}{(1 + \delta)x_{T_{i+1}}} \\
        & \geq \varepsilon_{T_{i}} + \varepsilon_{T_{i}} \delta (1 + \delta)^{i} x_0 \frac{1}{(1 + \delta)^{i+2}x_0 } \\
        & \geq (1 + \frac{\delta}{(1 + \delta)^2})  \varepsilon_{T_{i}}
    \end{align*}
    Therefore let \(i' \) be such that \((1 + \delta)^{i'} \geq \frac{\mu}{\varepsilon}\), we have that 
    \begin{align}
        \varepsilon_{T_{i'}} \geq (1 + \frac{\delta}{(1 + \delta)^2})^{i'} \varepsilon_0 \geq (1 - o(1)) \mu \gg \sigma\sqrt{\log d}
    \end{align}
    At \(t = T_{i'}\) we have \(y_{T_{i'}} \leq x_{T_{i'}} \leq (1 + o(1))\frac{\mu}{\varepsilon} x_0\). After \(t = T_{i'}\), we can construct a surrogate sequence \(\tilde{y}_t \geq y_t\) for \(t \geq T_{i'}\) by initializing \(\tilde{y}_{T_{i'}} = y_{T_{i'}} + \frac{\mu}{2}\), and updates by
    \begin{align*}
        \tilde{y}_{t+1} & = \tilde{y}_t + \eta C_t\tilde{y}_t^p
    \end{align*}
    This update guarantees that \(\tilde{y}_t \geq y_t\) because \(\tilde{y}_t^p \geq \E_\xi[(\xi+y_t)^p]\) at every step. Now by applying \cref{coro:TPM} to the sequence of \(x_t, \tilde{y}_t\) using the fact that \(x_{T_{i'}} \geq \tilde{y}_{T_{i'}} + \frac{\mu}{2} = \tilde{y}_{T_{i'}}(1 + \frac{1}{\polylog(d)})\), we can get the desired result for \(\tilde{y}_t\) and thus \(y_t\) at \(t = T_{b}\), i.e, when \(x_t \geq A\).
\end{proof}

\begin{lemma}[TPM, adapted from \cite{allen2020towards}]\label{lem:TPM}
    Consider an increasing sequence \(x_t \geq 0\) defined by \(x_{t+1} = x_t + \eta C_t x_{t}^{q-1}\) for some integer \(q \geq 3\) and \(C_t=\Theta(1) > 0\), 
    then we have for every $A>x_0$, for every \(\delta > 0\), and every \(\eta \in (0,1)\):
    \begin{align*}
        \sum_{t\geq 0, x_t\leq A}\eta C_t &\geq \left(\frac{\delta(1+\delta)^{-1}}{(1+\delta)^{q-2} - 1}\left(1 - \left(\frac{(1+\delta)x_0}{A}\right)^{q-2}\right) - \frac{O(\eta A^{q-1})}{x_0}\frac{\log(A/x_0)}{\log(1+\delta)}\right)\cdot\frac{1}{x_0^{q-2}} \\
        \sum_{t\geq 0, x_t\leq A}\eta C_t &\leq \left(\frac{(1+\delta)^{q-2}}{q-2} + \frac{O(\eta A^{q-1})}{x_0}\frac{\log(A/x_0)}{\log(1+\delta)}\right)\cdot\frac{1}{x_0^{q-2}} 
    \end{align*}
\end{lemma}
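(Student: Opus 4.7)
The plan is a standard continuous-vs-discrete comparison for the recursion $x_{t+1} = x_t + \eta C_t x_t^{q-1}$, executed on a geometric grid. The continuous analogue $\dot{x} = C x^{q-1}$ integrates to $\int C\,dt = \tfrac{1}{q-2}\big(x_0^{-(q-2)} - A^{-(q-2)}\big)$, which matches the leading-order shape of both stated bounds in the limit $\delta\to 0$. The $(1+\delta)$ factors encode the resolution of the discrete slicing, and the additive $O(\eta A^{q-1})$ errors account for single-step overshoots across slice boundaries.

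Concretely, I would introduce the hitting times $T_i := \min\{t \ge 0 : x_t \ge (1+\delta)^i x_0\}$ for $i = 0, 1, \dots, b$, where $b := \lfloor \log(A/x_0)/\log(1+\delta)\rfloor$. Monotonicity of $(x_t)$ and $C_t = \Theta(1)$ guarantee each $T_i$ is finite, and for $t \in [T_i, T_{i+1})$ one has $x_t \in \big[(1+\delta)^i x_0,\,(1+\delta)^{i+1} x_0 + O(\eta A^{q-1})\big]$, the additive term bounding the worst-case single-step jump $\eta C_{t-1} x_{t-1}^{q-1}$. Telescoping $x_{t+1}-x_t = \eta C_t x_t^{q-1}$ across the slice and sandwiching $x_t^{q-1}$ by its slice extremes yields
\begin{align*}
\frac{\delta(1+\delta)^i x_0 - O(\eta A^{q-1})}{\big((1+\delta)^{i+1} x_0\big)^{q-1}} \;\le\; \sum_{t\in[T_i,T_{i+1})} \eta C_t \;\le\; \frac{\delta(1+\delta)^i x_0 + O(\eta A^{q-1})}{\big((1+\delta)^i x_0\big)^{q-1}}.
\end{align*}

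Summing over $i = 0,\dots,b-1$ then recovers both bounds. For the lower bound, the geometric sum $\sum_{i=0}^{b-1}\delta(1+\delta)^{-i(q-2)-(q-1)} x_0^{-(q-2)}$ simplifies (via $\sum_{i=0}^{b-1}r^i = (1-r^b)/(1-r)$ with $r = (1+\delta)^{-(q-2)}$) to $\frac{\delta(1+\delta)^{-1}}{(1+\delta)^{q-2}-1}\big(1 - (1+\delta)^{-b(q-2)}\big)\,x_0^{-(q-2)}$; substituting $(1+\delta)^{-b} \le (1+\delta)x_0/A$ produces the claimed main term. For the upper bound, Bernoulli's inequality $(1+\delta)^{q-2}-1 \ge (q-2)\delta$ bounds the (extended) infinite geometric sum by $\frac{(1+\delta)^{q-2}}{(q-2)x_0^{q-2}}$. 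In both cases, the accumulated overshoot contributes at most $b\cdot O(\eta A^{q-1}/x_0^{q-1})$, which rewrites as $\frac{O(\eta A^{q-1})}{x_0}\cdot \frac{\log(A/x_0)}{\log(1+\delta)}\cdot x_0^{-(q-2)}$ once $b = \Theta(\log(A/x_0)/\log(1+\delta))$ is substituted.

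The main obstacle is careful bookkeeping: producing the exact multiplicative constant $\delta(1+\delta)^{-1}/((1+\delta)^{q-2}-1)$ rather than a loose $1/(q-2)$ requires closing the finite geometric series without approximation, and the overshoot $O(\eta A^{q-1})$ must be verified uniform-in-$i$ (which follows because $x_t \le (1+O(\eta A^{q-2}))A$ throughout the slicing range). A secondary edge case is when $\eta$ is large enough that $T_{i+1} = T_i + 1$ (a single step jumps across multiple levels); here the per-slice sum reduces to a single term $\eta C_t$ and is absorbed into the overshoot error rather than the main geometric contribution, so the stated bounds continue to hold trivially.
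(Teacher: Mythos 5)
The paper states this lemma without proof, citing it directly from Allen-Zhu and Li; your geometric-slicing argument (hitting times $T_i$ at levels $(1+\delta)^i x_0$, telescoping within each slice, closing the finite geometric series for the lower bound and the Bernoulli-bounded infinite series for the upper bound, with per-slice overshoot $O(\eta A^{q-1})$) is exactly the standard proof of this tensor-power-method estimate and is correct, including the handling of empty slices where a single step jumps a level.
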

This lemma has a corollary:
\begin{corollary}[TPM, from \cite{allen2020towards}]\label{coro:TPM}
    Let \(q\geq 3\) be a constant and \(x_0, y_0 = o(1)\) and \(A =O(1)\). Let \(\{x_t, y_t\}_{t\geq 0}\) be two positive sequences updated as 
    \begin{itemize}
        \item \(x_{t+1} = x_t + \eta C_t x_t^{q-1}\) for some \(C_t =\Theta(1)\);
        \item \(y_{t+1} = y_t + \eta S C_t y_t^{q-1}\) for some \(S=\Theta(1)\).
    \end{itemize}
    Suppose \(x_0 \geq y_0 S^{\frac{1}{q-2 }}(1 + \frac{1}{\polylog (d)})\), letting $T_{x}$ be the first iteration s.t., $x_t\geq A$,
    then $y_{T_{x}} \leq \widetilde{O}(y_0).$ 
\end{corollary}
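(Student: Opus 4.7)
The plan is to deduce the corollary directly from the two-sided time bounds in Lemma~C.4 (TPM), by comparing how much ``effective time'' $\sum_t \eta C_t$ it takes for $x_t$ to reach $A$ versus how much it takes for $y_t$ to grow beyond a polylogarithmic multiple of $y_0$. Roughly, both sequences follow the continuous ODE $\dot u = c\, u^{q-1}$, whose escape time from a neighborhood of $u_0$ scales as $u_0^{-(q-2)}/(q-2)$. The factor $S$ in the $y$-update is absorbed into the effective step size, so the hypothesis $x_0^{q-2} \ge y_0^{q-2}\,S\,(1+1/\polylog d)^{q-2}$ should give $x_0$ a strictly smaller escape time than $y_0$ by a $1+1/\polylog d$ relative margin, more than enough to ensure $y_{T_x}$ stays at the same order as $y_0$ up to polylog.

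First I would apply the upper bound of Lemma~C.4 to the $x$-sequence, with a parameter $\delta = \delta_x := 1/\polylog(d)$, to obtain
\begin{equation*}
  \sum_{t\ge 0,\ x_t \le A} \eta C_t
  \;\le\;
  \underbrace{\frac{(1+\delta_x)^{q-2}}{q-2}\cdot\frac{1}{x_0^{q-2}}}_{=:\mathsf{U}(x_0)}
  + \underbrace{\frac{O(\eta A^{q-1})}{x_0}\cdot\frac{\log(A/x_0)}{\log(1+\delta_x)}}_{\text{error}_x}.
\end{equation*}
Since we may choose $\eta$ as small as we like (it does not appear in the target inequality), the error term is negligible and $\sum_{t<T_x}\eta C_t \le (1+o(1))\,\mathsf{U}(x_0)$. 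Next I would suppose, toward a contradiction, that $y_{T_x} \ge B$ for some target $B = \polylog(d)\cdot y_0$ to be chosen, and apply the lower bound of Lemma~C.4 to the $y$-sequence. Observing that $y_{t+1} = y_t + \eta' _t y_t^{q-1}$ with $\eta'_t := \eta S C_t$, the lemma (with parameter $\delta = \delta_y := 1/\polylog(d)$) yields
\begin{equation*}
  S\sum_{t\ge 0,\ y_t \le B}\eta C_t
  \;\ge\;
  \underbrace{\frac{\delta_y(1+\delta_y)^{-1}}{(1+\delta_y)^{q-2}-1}\Big(1 - \big(\tfrac{(1+\delta_y)y_0}{B}\big)^{q-2}\Big)\cdot\frac{1}{y_0^{q-2}}}_{=:\mathsf{L}(y_0,B)}
  - \text{error}_y.
\end{equation*}
Combining the two bounds (and discarding the small error terms by choosing $\eta$ small) gives the necessary condition $\mathsf{U}(x_0) \ge \tfrac{1}{S}\,\mathsf{L}(y_0,B)$ for $y_{T_x} \ge B$ to be possible.

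The crux of the argument is then to verify that our hypothesis $x_0 \ge y_0\, S^{1/(q-2)}(1 + 1/\polylog d)$ contradicts this inequality for a suitable $B = \widetilde{O}(y_0)$. Expanding $\mathsf{U}$ and $\mathsf{L}$, the condition $\mathsf{U}(x_0) \ge \mathsf{L}(y_0,B)/S$ is equivalent to
\begin{equation*}
  \frac{x_0^{q-2}}{S\,y_0^{q-2}}
  \;\le\;
  \frac{(q-2)(1+\delta_x)^{q-2}\,\big[(1+\delta_y)^{q-2}-1\big](1+\delta_y)}
       {\delta_y\Big(1-\big(\tfrac{(1+\delta_y)y_0}{B}\big)^{q-2}\Big)}.
\end{equation*}
As $\delta_y\to 0$, the factor $[(1+\delta_y)^{q-2}-1]/\delta_y \to q-2$, so the RHS tends to $(q-2)^2 (1 + O(\delta_x+\delta_y+(y_0/B)^{q-2}))/(q-2) = 1 + o(1)$ — wait, let me recompute: the RHS tends to $(1+\delta_x)^{q-2}(q-2)(1+o(1))/(q-2) \cdot (1+o(1)) = 1 + O(\delta_x + \delta_y + (y_0/B)^{q-2})$. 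Choosing $\delta_x,\delta_y = 1/\polylog d$ and $B = y_0 \cdot \polylog d$ (large enough that $(y_0/B)^{q-2}$ is a tiny polylogarithmic factor), the RHS becomes $1 + O(1/\polylog d)$. But by hypothesis $x_0^{q-2}/(S y_0^{q-2}) \ge (1+1/\polylog d)^{q-2} = 1 + \Omega(1/\polylog d)$, which (choosing the polylog factors so that the ``$\Omega$'' dominates the ``$O$'') contradicts the above inequality. Hence $y_{T_x} < B = \widetilde{O}(y_0)$, as claimed.

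The main obstacle will be the bookkeeping of the constants: one must carefully select the three polylogarithmic parameters $\delta_x$, $\delta_y$, and $B/y_0$ so that (i) the discretization error terms $\text{error}_x, \text{error}_y$ remain negligible (using $\eta$ small and $A=O(1)$), (ii) the multiplicative slack from $(1+\delta_x)^{q-2}$ and from $1-((1+\delta_y)y_0/B)^{q-2}$ together shrink faster than the $1+1/\polylog(d)$ margin provided by the hypothesis, and (iii) the final $B$ is still $\widetilde{O}(y_0)$. None of these is deep, but they interact and must be chosen consistently; apart from this juggling, the proof is a clean comparison-of-escape-times argument.
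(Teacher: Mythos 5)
Your proposal is correct and is essentially the intended derivation: the paper states this corollary without proof as a consequence of the two-sided bounds in the preceding TPM lemma, and your escape-time comparison (upper bound on $\sum\eta C_t$ for $x$ to reach $A$ versus lower bound on $S\sum\eta C_t$ for $y$ to reach $B=\polylog(d)\cdot y_0$, with the contradiction driven by the $(1+1/\polylog d)^{q-2}$ margin in the hypothesis) is exactly the right way to do it. One small slip: in your displayed rearrangement the factor $(q-2)$ belongs in the denominator of the right-hand side, not the numerator — your subsequent in-text recomputation of the limit as $1+O(\delta_x+\delta_y+(y_0/B)^{q-2})$ is the correct one, so the conclusion stands.
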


\section{Learning Symmetry Group Actions}\label{appendix:learning-symmetry-actions}
\paragraph{Roadmap.} We begin by introducing notation and defining fibers, followed by a description of the learning curriculum and the key appearance events. We then break the training process into clearly defined phases and prove the main results for each phase: In Phase I, features emerge and compete; In Phase II, features grow and mutually cancel, with precise tracking of when different outcomes occur; In Phase III, features converge to a desired shape.

Let us restate the action structure of \(\cG\) and \(\cY\) here.

\begin{assumption}[Assumption \ref{assump:structure-2}, restated]\label{assump:structure-2-restated}
    Let \(\lego(\cX,\cG,\cY)\) be the LEGO language, where \(|\cY| = n_Y\)\footnote{We use \(n_y\) or \(n_Y\) interchangeably in the proof below.} and \(\cG\) is a group acting on on \(\cY\). We assume the (left) group action \(\alpha: \cG\times \cY \to \cY\) is transitive but not free. In particular, we assume \(n_Y = |\cY| \in [\omega(1), O(\log d)]\) and the group \(\cG\) satisfy \(n_G = |\cG| = \Theta(\polylog (d)) > \frac{1}{\varrho}\) for all \(y\in\cY\). 
\end{assumption}

\begin{remark}
    When the action \(\alpha: \cG\times \cY\to \cY\) is transitive, there is only one orbit \(\cY = \{g\cdot y \mid g\in \cG\} \simeq \cG/G_y\) for all \(y \in \cY\). By orbit-stabilizer theorem, the stabilizer\(G_y\) at any point \(y\) has the same cardinality. 
\end{remark}

\begin{remark}
    The canonical action of a symmetry group \(S_n\) on \(\ZZ_n\) satisfy \cref{assump:structure-2-restated} with the choice \(n_y = \Theta(\frac{\log \log d}{\log\log\log d})\) that keeps \(n_y! = O(\polylog (d))\).
\end{remark}

\subsection{Preliminaries}
We work with fiber-indexed features. We first define fibers and super-combinations \(\varphi\), then we define neuron features by its index \(\psi=(j,r,\varphi)\), and finally set the curriculum and appearance events in this notation.

\begin{definition}[Fiber of values]\label{def:fiber-of-values}
    Assuming \(\cG\) follows \Cref{assump:structure-2-restated}. For each \(j\in\tau(\cY)\) and \(y \in \cY\), define the \textbf{fiber} of \(j\) at \(y\) as:
    \begin{align*}
        \fiber_{j,y} := \{g \in \cG \mid \tau(g\cdot y) = j\}
    \end{align*}
    which collects all group elements that send \(y\) to \(y' = \tau^{-1}(j)\).
\end{definition}

\begin{remark}
    We denote \(\fiber_{j,y}\) to be the set that transport \(y\) to \(y' = \tau^{-1}(j)\) because it is the pre-image of the predictor map \(\phi_y(\cdot): g \mapsto \tau(g\cdot y)\) over \(y\in\cY\). Algebraically the fibers are the left-cosets \(\{gG_y\}_{g \in \cG}\), where \(G_y := \{g\in\cG\mid g\cdot y = y\}\) is the stablizer subgroup. Moreover, we have the following fact from the orbit-stablizer theorem.
\end{remark}

\begin{fact}[cardinality of the fibers]
    For every \(j \in \tau(\cY)\) and \(y \in \cY\), \(|\fiber_{j,y}| = |\cG| / |\cY| = n_G/n_Y\) from the orbit-stablizer theorem.
\end{fact}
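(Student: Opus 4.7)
The plan is to invoke the orbit--stabilizer theorem and use the coset structure of \(\fiber_{j,y}\). First I would define the stabilizer subgroup \(G_y := \{g \in \cG \mid g \cdot y = y\}\) and recall that by \Cref{assump:structure-2-restated} the action \(\alpha\) is transitive, so the orbit of \(y\) equals \(\cY\). The orbit--stabilizer theorem then immediately gives \(|\cG| = |G_y|\cdot|\cY|\), hence \(|G_y| = n_G/n_Y\).

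Next I would show that \(\fiber_{j,y}\) is a left coset of \(G_y\). Fix any \(g_0 \in \cG\) with \(\tau(g_0\cdot y) = j\) (such \(g_0\) exists by transitivity). Then \(g\in\fiber_{j,y}\) iff \(g\cdot y = g_0\cdot y\) iff \(g_0^{-1}g \in G_y\) iff \(g\in g_0 G_y\). So \(\fiber_{j,y} = g_0 G_y\), and since left multiplication by \(g_0\) is a bijection on \(\cG\), we conclude \(|\fiber_{j,y}| = |G_y| = n_G/n_Y\).

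There is essentially no obstacle here: the statement is a textbook consequence of orbit--stabilizer once one recognizes that the preimage sets of the orbit map \(\phi_y : g \mapsto \tau(g\cdot y)\) are exactly the left cosets of \(G_y\). The only thing to double-check is the transitivity hypothesis, which is explicitly part of \Cref{assump:structure-2-restated}; without it some fibers could be empty and the uniform cardinality would fail.
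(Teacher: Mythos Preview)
Your proposal is correct and matches the paper's approach: the paper simply invokes the orbit--stabilizer theorem, and the remark preceding the fact already notes that the fibers are the left cosets \(gG_y\), which is exactly the mechanism you spell out.
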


\begin{definition}[Super-combinations]\label{def:super-combinations-symmetry}
    Let \((j,y)\in\tau(\cY)\times\cY\), we define the \textbf{super-combinations}, which is a set of combinations:
    \[
        \varphi_{j,y}:=\fiber_{j,y}\times\{y\} \subseteq \cG\times\cY, \quad \varphi_{j,y}^1 = \fiber_{j,y}, \quad \varphi_{j,y}^2 = \{y\}
    \]
    Let \(\Phi:= \cup_{j\in\tau(\cY),\ y\in\cY}\varphi_{j,y}\) be the full set. For a fixed class \(j \in \tau(\cY)\), we denote the \textbf{correct} set to be \(\Phi_j^\star:=\cup_{y \in \cY}\varphi_{j,y}\), and the \textbf{incorrect} set \(\Phi_j^\dagger:=\cup_{j'\neq j, y \in \cY}\varphi_{j,y}\).
    A base pair \(\phi=(g,y)\) belongs to \(\varphi_{j,y}\) if and only if \(g\in\fiber_{j,y}\). To simplify the proof, we also define for each \(\varphi_{j,y} \subset \Phi\), the confounding combinations
    \begin{align*}
        \Phi_{\varphi}^\dagger = \bigcup_{j'\neq j \textbf{ XOR } y'\neq y}\varphi_{j',y'}
    \end{align*}
    That is, the set of combinations that intersect with exactly one component with \(\varphi_{j,y}\).
\end{definition}

\begin{definition}[Neuron feature indices]\label{def:psi-varphi-symmetry}
    Let us index neuron features by \(\psi=(j,r,\varphi)\) with \(\varphi\in\Phi\) and \((j,r)\in\tau(\cY)\times[m]\), and define
    \[
        \Psi^{\star}_{j,r}:=\{(j,r,\varphi):\varphi\in\Phi_j^\star\},\quad \Psi^{\dagger}_{j,r}:=\{(j,r,\varphi):\varphi\in\Phi_j^\dagger\}.
    \]
    And we further define 
    \[
        \Psi:=\tau(\cY)\times[m]\times\Phi,\quad \Psi^\star = \bigcup_{(j,r)\in\tau(\cY)\times[m]}\Psi^{\star}_{j,r},\quad \Psi^\dagger = \bigcup_{(j,r)\in\tau(\cY)\times[m]}\Psi^{\dagger}_{j,r}
    \]
    The set \(\Psi\) contains all the neuron indices we care about, and \(\Psi^\star\) and \(\Psi^\dagger\) contain the correct and incorrect indices for class \(j\).
\end{definition}

\paragraph{\(V\)-Notations} We call a pair \(\phi=(g,y)\) with \(g\in\cG\) and \(y\in\cY\) a \emph{base pair}, or following the naming of \cref{appendix:learning-simply-transitive-actions}, a \emph{base combination}. For neuron \((j,r)\), we define
\[
    V_{j,r}(g):=\vbrack{\Wb_{5,j,r,2},e_g},\quad V_{j,r}(y):=\vbrack{\Wb_{5,j,r,5},e_y},\quad V_{j,r}(\phi):=\tfrac{1}{2}\big(V_{j,r}(g)+V_{j,r}(y)\big).
\]
For a fixed value \(y\) and class \(j\), the fiber \(\fiber_{j,y}\) collects all permutations sending \(y\) to \(\tau^{-1}(j)\). For \(\psi=(j,r,\varphi_{j,y})\), we define:
\[
    V_{\psi}:=\frac{1}{n_G/n_Y}\sum_{g\in\fiber_{j,y}} V_{j,r}(g,y),\quad \overline{V}_{\psi}:=\max_{g\in\fiber_{j,y}} V_{j,r}(g,y),\quad \underline{V}_{\psi}:=\min_{g\in\fiber_{j,y}} V_{j,r}(g,y).
\]
For compactness of notation, let us also write \(V_{j,r}(x) = \sum_{p=3,4}\vbrack{\Wb_{5,j,r,1}, e_{x_0}} + \vbrack{\Wb_{5,j,r,1}, e_{x_1}}\) which is a random variable depending on the randomness of \(x_0, x_1\).

The definition of learning curriculum for the symmetry group actions is essentially the same with the simply transitive case, with a slight difference of using the fiber-combination \(\varphi\) instead of the base combination \(\phi\). We repeat here for reference.

\begin{definition}[Learning curriculum, symmetry group actions]\label{def:learning-curriculum-symmetry}
    We generate an feature index set \(\Sigma^\star\) over fiber-indexed features. Letting \(\Sigma_0 = \Psi\), at each \(i\in[n_y^2]\), we choose
    \[
        \psi_i = \arg\max_{\psi=(j,r,\varphi)\in\Sigma_i,\ \varphi \in \Phi_j^\star} V^{(0)}_{\psi}.
    \]
    Write \(\psi_i=(j,r,\varphi)\), we define the next iteration set \(\Sigma_{i+1}\) by excluding:
    \begin{enumerate}[(1)]
        \item The confusing combinations for \(\varphi\) in the same neuron: 
        \[
            \Sigma_{\psi_i}^{\dagger,1} \equiv \Sigma_i^{\dagger,1}=\{(j,r,\varphi')\in\Sigma_i, \varphi' \in \Phi_\varphi^\dagger \};
        \]
        \item All other combinations in the same neuron \((j,r)\): 
        \[
            \Sigma_{\psi_i}^{\dagger,2} \equiv \Sigma_i^{\dagger,2}=\{(j,r,\varphi') \mid \varphi' \in \Phi \setminus (\Phi_{\varphi}^\dagger\cup\{\varphi\})\};
        \]
        \item The same combination \(\varphi_{j,y}\) in other neurons of the same class \(j,r', r'\neq r\): 
        \[
            \Sigma_{\psi_i}^{\dagger,3} \equiv \Sigma_i^{\dagger,3}=\{(j,r',\varphi_{j,y})\in\Sigma_i: r'\ne r\}.
        \]
    \end{enumerate}
    The next iteration is given by 
    \[\Sigma_{i+1}=\Sigma_i\setminus(\Sigma_{\psi_i}^{\dagger,1}\cup\Sigma_{\psi_i}^{\dagger,2}\cup\Sigma_{\psi_i}^{\dagger,3}\cup\{\psi_i\})
    \]
    Eventually we obtain 
    \(\Sigma^{\star}=(\psi_1,\ldots,\psi_{n_y^2})\) and \(\Sigma^{\dagger}:=\bigcup_{\psi \in \Sigma^\star}\Sigma_{\psi}^{\dagger}\). We write \(\psi\prec_{\Sigma}\psi'\) (or simply \(\psi\prec \psi'\)) if \(\psi\) precedes \(\psi'\) in \(\Sigma^\star\). More generally, we also write \(\psi \prec\psi'\) for \(\psi,\psi' \in \Psi\) if \(\psi \in \Sigma^\star\) and \(\psi' \in \cup_{\psi'' \succ \psi}\Sigma^\dagger_{\psi''}\) is from the .
\end{definition}

\paragraph{Events of combination appearance.}
These events specify whether a specific base combination \(\phi=(g,y)\) appears, or only one of its component matches.
\begin{definition}[Events of appearance]\label{def:appearance-events-symmetry}
    For \(\phi=(g,y) \in \varphi_{j,y}\) for some \(j \in \tau(\cY)\) and \(y \in \cY\), we define
    \[
        \cH_\phi := \{g_1=g,\ y_0=y\},\quad \cH_g := \{g_1=g\},\quad \cH_y := \{y_0=y\}, \quad
    \]
    For the mismatched events \(\cH^\dagger\), the definition is a little different from the simply transitive actions: since \(\varphi_{j,y} = \fiber_{j,y}\times \{y\}\) and \(g \in \fiber_{j,y}\), we write
    \[
        \cH^{\dagger}_{\varphi}(g):=\{g_1 = g, y_0 \neq y\},\quad \cH^{\dagger}_{\varphi}(y):=\{g_1 \notin \fiber_{j,y},\ y_0=y\},
    \]
    When considering the event of all the base pair \(\phi \in \varphi_{j,y}\), we also define the union events
    \[
        \cH_{\varphi}:=\bigcup_{\phi\in\varphi}\cH_\phi,\quad \cH^{\dagger}_{\varphi}:=\cH^{\dagger}_{\varphi}(y) \cup \Big(\bigcup\nolimits_{\phi = (g,y)\in\varphi}\cH^{\dagger}_{\varphi}(g)\Big).
    \]
\end{definition}

\subsubsection{Theorem Statement}

Now we present the theorem for learning symmetric group actions.

\begin{theorem}[Learning Symmetric Group Actions]\label{thm:learning-symmetric-actions}
    Assuming \cref{assump:structure-2}, and let \(F^{(t)}\) be obtained by Algorithm~\ref{alg:cot-symmetry-training} at \(t = T_1\), then the loss for the \(5\)th-token is minimized, i.e., \(\Loss_5^{(T_1)} \leq \frac{1}{\poly(d)}\), and for each \(\psi = (j,r,\varphi) \in \Sigma^\star\), the followings hold:
    \begin{enumerate}[A.]
        \item For any \(\phi \in \varphi\), we have \(V_{j,r}^{(T_1)} (\phi) \in [B - O(d^{c_1}\mu), B + O(\mu)]\);
        \item For any \(\psi' = (j,r,\varphi') \in \Sigma^{\dagger, 1}\) and \(\phi' \in \varphi'\), we have \(|V^{(T_1)}_{j,r}(\phi')| \leq \tO(\varpi^{\frac{1}{q-1}})\). Or more concretely:
        \[
            \frac{1}{2}\Big|V_{j,r}^{(T_1)}(g') + V_{j,r}^{(T_1)}(y')\Big| \leq \tO(\varpi^{\frac{1}{q-1}}),\quad \forall (g',y') \in \varphi'
        \]
        \item For any \(\phi = (g,y) \in \varphi\) the following relation holds:
        \[
           \Big| V^{(T_1)}_{j,r}(g) - C_\alpha V_{j,r}^{(T_1)} (y) \Big| \leq \tO(\varpi^{\frac{1}{q-1}})
        \]
        where \(C_\alpha = \frac{1 + n_G(n_Y - 1)/n_Y }{(n_Y-1) + n_G/n_Y} = \Theta(n_Y)\) due to our choice of \(n_G, n_Y\) in \cref{assump:structure-2-restated}.
        \item For any \(\psi' \in \Sigma^{\dagger,3}\), it holds that \(\overline{V}_{\psi'}^{(t)} = \tO(\mu)\).
    \end{enumerate}
\end{theorem}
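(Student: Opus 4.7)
The plan is to adapt the four-phase curriculum-based proof of Theorem~\ref{thm:mlp-simply-transitive} (simply transitive actions) to the symmetry setting, tracking fiber-indexed features $V_\psi$ with $\psi=(j,r,\varphi)$ rather than base-pair features. Because every $g\in\fiber_{j,y}$ simultaneously sends $y$ to the same target $\tau^{-1}(j)$, the natural learning unit is the super-combination $\varphi_{j,y}$ of Definition~\ref{def:super-combinations-symmetry}, and the curriculum $\Sigma^\star$ of Definition~\ref{def:learning-curriculum-symmetry} orders these super-combinations by the initial value of the fiber-average $V_\psi^{(0)}=\frac{n_Y}{n_G}\sum_{g\in\fiber_{j,y}}V_{j,r}^{(0)}(g,y)$. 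First I would state a target shape $\mathcal{F}_\psi(\delta_1,\delta_2)$ analogous to Definition~\ref{def:feature-shape-simply}, but encoding the asymmetric equilibrium $V_{j,r}(g)\approx 2B$ for $g\in\fiber_{j,y}$ and $V_{j,r}(y)\approx 2B/n_Y$ required by the theorem, together with the pairwise cancellation $V_{j,r}(g)+V_{j,r}(y')\le \widetilde{O}(\varpi^{1/(q-1)})$ and $V_{j,r}(g')+V_{j,r}(y)\le\widetilde{O}(\varpi^{1/(q-1)})$ for $g'\notin\fiber_{j,y}$, $y'\ne y$. Conclusions~A--D and the loss bound $\Loss_5^{(T_1)}\le 1/\poly(d)$ then follow as corollaries of $\mathcal{F}_\psi(d^{c_1}\mu,\widetilde{O}(\varpi^{1/(q-1)}))$ holding for every $\psi\in\Sigma^\star$ at $t=T_1$.

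Phase~I (feature emergence) is essentially the porting of Lemma~\ref{lem:grad-proxy-simply} to the fiber level: I would introduce a proxy $\widetilde{V}_\psi^{(t)}$ driven only by the positive gradient $\Gamma_\psi^{+,(t)}=\mathbb{E}\big[(1-\logit_{5,j})\,\mathbf{sReLU}'(\Lambda_{5,j,r})\,\1_{\cH_{\varphi}}\big]$. Since $\Pr(\cH_{\varphi_{j,y}})=1/n_Y^2$, the tensor-power-method template of Lemma~\ref{lem:tpm-in-expectation} still applies; combined with an initialization-gap fact established by noting that fiber-averages of the independent Gaussians $\Wb_{5,j,r,\cdot}^{(0)}$ are themselves Gaussian with variance $\Theta(\sigma_0^2\cdot n_Y/n_G)$, a union bound analogous to Fact~\ref{fact:range-parameter-simply} enforces the serial ordering of $\Sigma^\star$. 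Phase~II then reuses the gradient-stationarity counter-argument of Lemma~\ref{lem:grad-stationarity-simply} applied to the fiber-level potential
\[
\Upsilon^{(t)} := V_\psi^{(t)} \; - \; \sum_{\varphi'\in\Phi_j^\star\setminus\{\varphi\}} V_{j,r}^{(t)}(\varphi'),
\]
to conclude that $\cC_\psi^+(\delta)$ can fail for only $\widetilde{O}(1/(\eta\delta))$ iterations before Phase~II ends, and likewise for $\cC_\psi^-(\delta)$. These bounds jointly force $V_\psi^{(T_\psi^2)}\ge B-O(d^{c_1}\mu)$ (statement~A) and $|V_{\psi'}^{(T_\psi^2)}|\le\widetilde{O}(\varpi^{1/(q-1)})$ for every $\psi'\in\Sigma_\psi^{\dagger,1}$ (statement~B).

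The principal new difficulty, and the step I expect to be the main obstacle, is establishing statement~C with the exact per-coordinate ratio $C_\alpha=(1+n_G(n_Y-1)/n_Y)/((n_Y-1)+n_G/n_Y)$. Unlike the simply transitive case of Lemma~\ref{lem:symmetry-of-features-simply}, the coordinates $\{V_{j,r}(g)\}_{g\in\fiber_{j,y}}$ within one fiber receive the same positive gradient on $\cH_\varphi$ but asymmetric negative gradients: the shared coordinate $V_{j,r}(y)$ is decremented by every event $\cH_y\cap\{g_1\notin\fiber_{j,y}\}$, of which there are $n_G(n_Y-1)/n_Y$, whereas each $V_{j,r}(g)$ with $g\in\fiber_{j,y}$ is decremented on only $n_Y-1$ confounding events (one per wrong $y'$). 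My plan is to combine (i) the global conservation identities $\sum_g V_{j,r}^{(t)}(g)=\sum_y V_{j,r}^{(t)}(y)\pm O(\mu)$ obtained by summing the closed-form gradient expressions in Fact~\ref{fact:grad-computation-simply}, (ii) the pairwise cancellation from statement~B, and (iii) the stationarity $\Gamma_{j,r,g}^{(t)}\approx\Gamma_{j,r,y}^{(t)}\approx 0$ at the end of Phase~II, into a small linear system whose unique solution is $V_{j,r}(g)=C_\alpha V_{j,r}(y)$. Substituting back into $V_{j,r}(\phi)=\tfrac12(V_{j,r}(g)+V_{j,r}(y))=B\pm O(d^{c_1}\mu)$ then recovers the claimed magnitudes $V_{j,r}(g)\approx 2B$ and $V_{j,r}(y)\approx 2B/n_Y$.

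Phase~III and the loss bound are then routine: once all $\psi\in\Sigma^\star$ have reached $\mathcal{F}_\psi$, Fact~\ref{fact:logit-lower-bound} gives $\logit_{5,j}=1-O(\lambda)$ on $\cH_\varphi$ with $\varphi\in\Phi_j^\star$ and $O(\lambda/d)$ otherwise, so $\Loss_5^{(T_1)}\le 1/\poly(d)$, while statement~D follows from the same competition argument of Lemma~\ref{lem:competition-simply} ported to the fiber setting. One subtlety I anticipate needing additional care for is controlling the within-fiber spread $\overline{V}_\psi-\underline{V}_\psi$ throughout Phases~II--III; without this control the per-$g$ identities underlying the derivation of $C_\alpha$ could drift, so I would add an auxiliary argument comparing $V_{j,r}(g)-V_{j,r}(g')$ for $g,g'\in\fiber_{j,y}$ and use their common positive gradient and symmetric negative gradient to keep them within $\widetilde{O}(\sigma_0)$ of each other for all $t\le T_1$.
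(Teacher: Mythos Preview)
Your proposal is correct and follows essentially the same four-phase, curriculum-based architecture as the paper's proof. A few minor deviations are worth noting. First, the paper's gradient-stationarity potential in Phase~II is the asymmetric quantity $\Upsilon_\psi^{(t)} = V_{j,r}^{(t)}(y) - \sum_{g'\notin\varphi^1} V_{j,r}^{(t)}(g')$ rather than your fiber-averaged $V_\psi - \sum_{\varphi'} V_{j,r}(\varphi')$; the asymmetric choice makes the cancellation of negative gradient terms more transparent given that $V_{j,r}(y)$ is the fast-growing coordinate, though your version should also work with slightly more bookkeeping. Second, for statement~C the paper (Lemma~\ref{lem:symmetry-of-features-symmetry}) uses only your ingredients (i) and (ii)---the global conservation $\sum_g V_{j,r}(g)=\sum_y V_{j,r}(y)\pm O(\mu)$ and the pairwise cancellations---and does not need the stationarity condition (iii); the linear system closes without it. Third, your anticipated within-fiber spread control is handled in the paper via a Phase~I induction (Induction~\ref{induction:phase-I-symmetry}) asserting $\Gamma_{j,r,y}^{+,(t)}\ge \Omega(n_G/n_Y)\,\Gamma_{j,r,g}^{+,(t)}$, which keeps all $V_{j,r}(g)$ for $g\in\fiber_{j,y}$ tied together through their common slow growth relative to $V_{j,r}(y)$.
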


The assertions of \Cref{thm:learning-symmetric-actions} described the \emph{feature shapes} of the neurons, which we formally defined in the proof of learning simply transitive actions (\Cref{def:feature-combinations-simply}). The main difference between the results here and \Cref{thm:mlp-simply-transitive} is that \Cref{thm:learning-symmetric-actions}C described a different symmetry of features without the factor \(\Theta(n_y)\). 

Let us follow the proof in \Cref{appendix:learning-simply-transitive-actions} and the feature shape for symmetry group actions.

\begin{definition}[Feature shape]\label{def:feature-shape-symmetry}
    Let \(\psi = (j,r,\varphi) \in \Psi\), and let \(\delta = (\delta_1, \delta_2)\) be the error parameters. We say the feature \(\psi\) has shape \(\cF_\psi(\delta)\) if the followings are satisfied:
    \begin{enumerate}[1.]
        \item \(\cF_{\psi,1}(\delta_1)\): \(\underline V_\psi^{(t)} \geq B - O(\delta_1)\) and \(\overline{V}_\psi^{(t)} \leq B + O(\delta_1)\);
        \item \(\cF_{\psi,2}(\delta_2)\): For any \(\psi' = (j,r,\varphi') \in \Sigma_\psi^{\dagger, 1}\), it holds that \(|V_{j,r}^{(t)}(\phi')| \leq O(\delta_2)\) for all \(\phi' \in \varphi'\);
        \item \(\cF_{\psi,3}(\delta_2)\): \(|V_{j,r}^{(t)}(g) - C_{\cY} V_{j,r}^{(t)}(y)| \leq O(\delta_2)\) for any \(\phi = (g,y) \in \varphi\);
        \item \(\cF_{\psi,4}\): For any feature \(\psi' \in \Sigma_\psi^{\dagger,3}\), it holds that \(\overline{V}_{\psi'}^{(t)} \leq \tO(\mu)\).
    \end{enumerate}
\end{definition}

Indeed, we aim to prove that \(\cF_\psi(\delta)\) holds for suitable parameter \(\delta\) at the end of training, for every \(\psi \in \Sigma^\star\). And this is indeed what the theorem states.

\begin{remark}
    A major difference between \Cref{def:feature-shape-symmetry} and \Cref{def:feature-shape-simply} is that now the conditions involve not just one combination of weights, but all combinations \(\phi \in \varphi\) given a combination \(\varphi \in \Phi\). This creates some significant difficulties for the proof.
\end{remark}

\subsubsection{Facts and Basic Calculations}

We now define the gradient notation and conditions.

\begin{definition}[gradient notation]\label{def:gamma-notation-symmetry}
    Let \(\psi=(j,r,\varphi)\), we write \(\Gamma^{+}\) and \(\Gamma^{-}\) to denote the different terms in the gradient computation. For any base combination \(\phi = (g,y) \in \varphi_{j,y}\) where \(\varphi_{j,y} \in \Phi\), we write
    \begin{align*}
        &\Gamma^{+,(t)}_{j,r,\phi} := \E\big[(1 - \logit_{5,j})\ReLU'(\Lambda_{5,j,r})\1_{\cH_\phi}\big],\quad
        \Gamma^{-,(t)}_{j,r,\phi} := \E\big[\logit_{5,j}\ReLU'(\Lambda_{5,j,r})\1_{\cH_\phi}\big].
    \end{align*}
    When we consider the gradient of \(V_{j,r}(g)\) or \(V_{j,r}(y)\) for individual tokens \(g\in\cG\) and \(y \in \cY\), we use the following notations: let \(\phi = (g,y) \in \varphi_{j,y}\), then we define
    \begin{align*}
        &\Gamma^{+,(t)}_{j,r,g} := \E\big[\logit_{5,j}\ReLU'(\Lambda_{5,j,r})\1_{\cH_{\phi}}\big] \\
        &\Gamma^{-,(t)}_{j,r,g} := \E\big[\logit_{5,j}\ReLU'(\Lambda_{5,j,r})\1_{\cH^{\dagger}_{\varphi}(g)}\big] \\
        &\Gamma^{-,(t)}_{j,r,y} := \E\big[\logit_{5,j}\ReLU'(\Lambda_{5,j,r})\1_{\cH^{\dagger}_{\varphi}(y)}\big].
    \end{align*}
    Finally, we sum up over \(\phi \in \varphi\) for the gradient of combination \(\psi = (j,r,\varphi)\):
    \begin{align*}
        &\Gamma^{+,(t)}_{\psi}:= \E\big[(1 - \logit_{5,j})\ReLU'(\Lambda_{5,j,r})\1_{\cH_{\varphi}}\big],\\
        &\Gamma^{-,(t)}_{\psi}:= \E\big[\logit_{5,j}\ReLU'(\Lambda_{5,j,r})\1_{\cH^{\dagger}_{\varphi}}\big].
    \end{align*}
\end{definition}

Let \(\psi = (j,r, \varphi)\) where \(\varphi_1 = \fiber_{j,y}\) One can check that the defined \(\Gamma^{+,(t)}_{\psi}\) is connected to the base pair \(\Gamma\) by the following relation:
\begin{align}
    \Gamma^{+,(t)}_{\psi} = \sum_{g \in \fiber_{j,y}}\Gamma^{+,(t)}_{j,r,g},\quad \Gamma^{-,(t)}_{\psi} = \Gamma^{-,(t)}_{j,r,y} + \sum_{g \in \fiber_{j,y}}\Gamma^{-,(t)}_{j,r,g}
\end{align}

\begin{definition}[gradient condition]\label{def:gradient-condition-symmetry}
    At \(t\le T_1\) and \(\delta>0\), write for \(\psi=(j,r,\varphi)\)
    \begin{align*}
        &\cC_{\psi}(\delta):\ |\Gamma^{+,(t)}_{\psi} - \Gamma^{-,(t)}_{\psi}|\le \delta,\qquad
        \cC^{+}_{\psi}(\delta):\ |\Gamma^{+,(t)}_{\psi}|\le \delta,\qquad
        \cC^{-}_{\psi}(\delta):\ |\Gamma^{-,(t)}_{\psi}|\le \delta.
    \end{align*}
\end{definition}
These conditions control the magnitude of the gradient of the feature at iteration \(t\). Next we compute the expressions of the gradient using

\begin{fact}[gradient expressions]\label{fact:grad-expression-symmetry}
    Equipped with \Cref{def:gradient-condition-symmetry}, the gradient with respect to \(\Wb_{i,j,r,p}\) for \(i = 5\), \(j \in [d]\), \(r \in [m]\), \(p \in [5]\) and \(v \in \cG\cup \cY\) can be computed by
    \begin{enumerate}[(a)]
        \item when \(j\in \tau(\cY), r\in[m], p = 5\), for \(g \in \cG\), let \(\phi = (g,y) \in \varphi_{j,y}\) be the combination with \(\phi_1 = g\), we have
        \begin{align}
            \vbrack{-\nabla_{\Wb_{5,j,r,2}}\Loss, e_g} = &\frac{1}{2} (\Gamma_{j,r,\phi}^{+,(t)} - \Gamma_{j,r,g}^{-,(t)}) \label{eq:grad-computation-g-symmetry}
        \end{align}
        \item when \(j\in \tau(\cY), r\in[m], p = 5\), for \(y \in \cY\), let \(\varphi_{j,y} \in \Phi^\star_j\) be the combination, we have
        \begin{align}
            \vbrack{-\nabla_{\Wb_{5,j,r,5}}\Loss, e_y} = &\frac{1}{2} (\Gamma_{\psi}^{+,(t)} - \Gamma_{j,r,y}^{-,(t)}) \label{eq:grad-computation-y-symmetry}
        \end{align}
        \item For any combination of \(p\), \(v\) not included above, we have due to our assumptions of the distribution and update rule:
        \begin{align*}
            \vbrack{-\nabla_{\Wb_{5,j,r,p}}\Loss, e_v} \equiv 0
        \end{align*}
    \end{enumerate}
\end{fact}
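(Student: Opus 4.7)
The plan is to prove Fact~\ref{fact:grad-expression-symmetry} by a direct computation, specializing the generic gradient formula of Fact~\ref{fact-icl-gd} (with $i=5$) to task $\cT^1$ in the stage 1.1 regime where $\Qb^{(t)}\equiv\Qb^{(0)}=\mathbf{0}$. I start from
\[
-\nabla_{\Wb_{5,j,r,p}}\Loss^{1}
= \mathbb{E}\!\Big[\cE_{5,j}\,\ReLU'(\Lambda_{5,j,r})\sum_{\kk\in\cI^{1,0}}\attn_{\ans,0\to\kk}\,\Zb_{\kk,p}\Big].
\]
Since $\Qb=\mathbf{0}$ yields uniform attention, $\attn_{\ans,0\to\kk}=1/2$ for each of the two clauses in $\cI^{1,0}=\{(\pred,1),(\ans,0)\}$, and the clause embeddings decompose as $\Zb_{\pred,1}=(e_{x_1},e_{g_1},e_{x_0},\mathbf{0},\mathbf{0})$ and $\Zb_{\ans,0}=(\mathbf{0},\mathbf{0},\mathbf{0},e_{x_0},e_{y_0})$.

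Next I take the slot-$p$ inner product with $e_v$. For case (a), $p=2$: only the predicate contributes, so the inner sum collapses to $\tfrac12\,\1_{g_1=g}$. Writing $\phi=(g,y)$ with $y:=g^{-1}\!\cdot\tau^{-1}(j)$ (the unique $y$ making $g\in\fiber_{j,y}$), I partition $\{g_1=g\}=\cH_{\phi}\,\sqcup\,\cH^{\dagger}_{\varphi}(g)$. On $\cH_{\phi}$ the true next value is $\tau^{-1}(j)$, so $\cE_{5,j}=1-\logit_{5,j}$; on $\cH^{\dagger}_{\varphi}(g)$, because $g$ is a bijection and $y_0\neq y$, we have $g\cdot y_0\neq\tau^{-1}(j)$, so $\cE_{5,j}=-\logit_{5,j}$. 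These contributions produce $\tfrac12\Gamma^{+,(t)}_{j,r,\phi}$ and $-\tfrac12\Gamma^{-,(t)}_{j,r,g}$ respectively, matching the claim.

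For case (b), $p=5$: only the answer clause contributes, so the inner sum collapses to $\tfrac12\,\1_{y_0=y}$. The symmetric-group structure enters here: the event $\{y_0=y\}$ decomposes as $\cH_{\varphi}\,\sqcup\,\cH^{\dagger}_{\varphi}(y)$, where $\cH_{\varphi}=\bigcup_{g\in\fiber_{j,y}}\cH_{(g,y)}$ collects all permutations that send $y$ to $\tau^{-1}(j)$. On $\cH_{\varphi}$ the correct label is $j$ by definition of the fiber, and summing $1-\logit_{5,j}$ over these disjoint events yields exactly $\Gamma^{+,(t)}_{\psi}$; on $\cH^{\dagger}_{\varphi}(y)$ the label differs from $j$, producing $-\Gamma^{-,(t)}_{j,r,y}$. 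For case (c), orthogonality is immediate: whenever $v\in\cG\cup\cY$ and $(p,v)$ lies outside (a) and (b), the token $v$ never occupies slot $p$ of either clause, so $\langle\Zb_{\kk,p},e_v\rangle\equiv 0$; moreover Assumption~\ref{assump:no-learning-x} freezes $\Wb_{5,j,r,p}$ for $p\in\{1,3,4\}$ by convention, so those entries receive no update regardless.

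The computation itself is routine; the only substantive point is the collapse in (b), where many distinct $g\in\fiber_{j,y}$ jointly produce the correct label $j$ and must be aggregated into a single quantity $\Gamma^{+,(t)}_{\psi}$. This aggregation---absent in (a) and in the simply transitive case, since the fibers are singletons there---is precisely what the super-combination notation of Definition~\ref{def:super-combinations-symmetry} is designed to record, and making the partition of $\{y_0=y\}$ explicit is the only place care is required.
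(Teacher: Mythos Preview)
Your proof is correct and follows precisely the approach the paper intends: the paper states this fact without an explicit proof, treating it as a direct specialization of the generic gradient formula in Fact~\ref{fact-icl-gd} (cf.\ the analogous Fact~\ref{fact:grad-computation-simply} in the simply-transitive section). Your partition of $\{g_1=g\}$ and $\{y_0=y\}$ into the correct-label and wrong-label events exactly recovers the $\Gamma^{\pm}$ notation of Definition~\ref{def:gamma-notation-symmetry}, and your remark that case~(b) requires aggregating over the whole fiber $\fiber_{j,y}$ into $\Gamma^{+,(t)}_{\psi}$ is the only point where the symmetric case genuinely differs from the simply-transitive one.
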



\subsubsection{Induction Hypothesis and Training Phases}

We define the following intermediate time-steps:

\begin{definition}[Phase decomposition]\label{def:phase-decomposition-symmetry}
Define timestamps for \(\psi\) using the feature:
    \begin{enumerate}[(a)]
        \item Phase I: \(t\in[0, T^1_{\psi}]\), where \(T^1_{\psi}:=\min\{t\geq 0: \underline{V}^{(t)}_{\psi}\geq d^{c_1}\mu\}\).
        \item Phase II.1: \(t\in(T^1_{\psi}, T^{2,1}_{\psi}]\), where \(T^{2,1}_{\psi}:=\min\{t\geq 0: V^{(t)}_{\psi}\geq \tfrac{1}{2}\log d\}\).
        \item Phase II.2: \(t\in(T^{2,1}_{\psi}, T^{2,2}_{\psi}]\), where \(T^{2,2}_{\psi}:=\min\{t\geq 0: \E[\cE_{5,j}^{(t)}\1_{\cH_{\varphi}}]\le d^{-1/2}\}\).
        \item Phase II.3: \(t\in(T^{2,2}_{\psi}, T^{2}_{\psi}]\), where \(T^{2}_{\psi} = \min\{T^{2,3}_{\psi}, T_{2,1} + O(\frac{1}{\eta (d^{c_1}\mu)^{q-2}})\}\), and \(T^{2,3}_{\psi}\) is defined by
        \[
            T^{2,3}_{\psi}:=\min\{t\geq 0: \cF_\psi(\delta_1,\delta_2) \text{ holds, where } \delta_1 = d^{c_1}\mu, \delta_2 = \tO((d\varpi/\lambda)^{\frac{1}{q-1}})\}
        \]
        \item Phase III.1: \(t\in(T^{2}_{\psi}, T_{1,1}]\), where \(T_{1,1} = T_{\psi_{n_y^2}}^2\) and \(\psi_{n_y^2}\) is the last feature in \(\Sigma^\star\).
        \item Phase III.2: \(t \in (T_{1,1}, T_1]\), the end of the training where the feature shape has stabilized
    \end{enumerate}
\end{definition}

We give several induction hypotheses, each characterizing different aspects of the process.

\begin{induction}[learning symmetric group actions]\label{induction:symmetric-group-actions}
    Let \(t \leq T_1\). The following holds:
    \begin{enumerate}[(a)]
        \item \(\underline{V}_{\psi}^{(t)} + b_{i,j,r} \geq \Omega(\mu)\) for all \(\psi\in\Sigma^{\star}\).
        \item \(T_{\psi}^2 < T_{\psi'}^1\) if \(\psi\prec\psi' \in \Sigma^\star\). Thus the intervals \(\{[T^{1}_{\psi}, T^{2}_{\psi}]\}_{\psi \in \Sigma^\star}\) are non-overlapping.
        \item Let \(\psi' \prec \psi \in \Sigma^\star\), then the feature shape \(\cF_{\psi'}(d^{c_1}\sigma_0, \varpi^{\frac{1}{q-1}})\) holds throughout \(t \in [T_{\psi}^1, T_1]\);
        \item For any \(\psi = (j,r,\varphi) \in \Sigma^\star\) There are at most \( \tO(\sqrt{d}/\eta)\) iterations where \(\logit^{(t)} \geq \frac{1}{\sqrt{d}}\) conditioned on \(\cH_\varphi\) in \(t \in [0, T_{\psi}^1]\).
    \end{enumerate}
\end{induction}

\paragraph{Interpretations of \cref{induction:symmetric-group-actions}.} We explain what the inductions hypothesis mean here:
\begin{enumerate}[(a)]
    \item This property simply asserts that the \emph{good} features never shrink below a certain level which is crucial for the optimization to work.
    \item This property describes the \emph{curriculum} nature of \(\Sigma^\star\): each feature \(V_\psi\) grows in different time intervals within \(0\leq t \leq T_1\). 
    \item This property asserts that the feature \(\psi'\) previous to \(\psi\) in the curriculum \(\Sigma^\star\) has stable and good feature shapes, which helps with learning the current feature.
    \item This property means there are very few iteration where the gradient of \(V_{j,r}(\phi), \phi \in \varphi\) is disturbed before \(T_{j,r,\varphi}^1\) is reached.
\end{enumerate}



\subsection{Phase I: Emergence of the Feature}

We first prove some properties at the beginning of phase I.

\begin{lemma}[Initialization range for features]\label{lem:init-range-feature-symmetry}
    Under random initialization and \cref{assump:structure-2-restated}, with probability at least \(1- o(1)\), the following holds
    \begin{enumerate}[(a)]
        \item For any \(v \in \cV\), we have \(|\vbrack{\Wb_{5,j,r,p}^{(0)}, e_x} | \leq O(\mu/\sqrt{\log d})\) for any \(j,r, p \).
        \item Let \(\psi \in \Sigma^\star, \psi' \in \Psi\) and \(\psi \prec \psi'\), then \(V_\psi^{(0)} \geq V_{\psi'}^{(0)} + \Delta_0\) for a gap \(\Delta_0=\Omega(\sigma_0/\polylog d)\).
        \item We have \(\Lambda_{5,j,r}^{(0)}(\Zb) = \Omega(\mu)\) for all \(j\in[d],r\in[m]\).
    \end{enumerate}
\end{lemma}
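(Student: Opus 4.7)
}
The three assertions are all consequences of Gaussian concentration / anti-concentration applied to the initialization $\Wb^{(0)}_{i,j,r,p}\sim\cN(\0,\sigma_0^2 \Ib_d)$ from \Cref{assump:init}, together with the orthonormality of the token embeddings $\{e_v\}_{v\in\cV\setminus\{\blank\}}$. I would handle them in the order (a), (c), (b), since (b) is the delicate one and relies on (a).

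For (a), each coordinate $\langle \Wb^{(0)}_{5,j,r,p},e_x\rangle$ is, by orthonormality of the $e_v$'s and independence of the Gaussian entries, a single Gaussian variable of variance $\sigma_0^2$. A standard Gaussian tail bound gives $|\langle \Wb^{(0)}_{5,j,r,p},e_x\rangle|\le O(\sigma_0\sqrt{\log d})=O(\mu/\sqrt{\log d})$ with probability $1-d^{-\omega(1)}$, and a union bound over the $\poly(d)$-many triples $(j,r,p)$ and vocabulary items $v$ is easily affordable. For (c), the pre-activation at $t=0$ equals
\[
\Lambda^{(0)}_{5,j,r}(\Zb)=\sum_{\kk\in\cI^{1,0}}\attn^{(0)}_{\ans,0\to\kk}\,\langle \Wb^{(0)}_{5,j,r},\Zb_\kk\rangle+b_{5,j,r},
\]
and since $\Qb^{(0)}=\0$ the attention is uniform. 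By (a) each inner product is $O(\mu/\sqrt{\log d})$, while the fixed bias equals $b_{5,j,r}=\sigma_0\log d=\Theta(\mu)$ and dominates, giving $\Lambda^{(0)}_{5,j,r}(\Zb)=(1+o(1))\mu=\Omega(\mu)$.

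For (b), the assertion is an anti-concentration / gap statement for the pairwise differences $V^{(0)}_\psi-V^{(0)}_{\psi'}$ over $\psi\prec\psi'$ in the curriculum. Unwinding the definition,
\[
V^{(0)}_\psi=\tfrac{1}{2|\fiber_{j,y}|}\sum_{g\in\fiber_{j,y}}\langle \Wb^{(0)}_{5,j,r,2},e_g\rangle+\tfrac{1}{2}\langle \Wb^{(0)}_{5,j,r,5},e_y\rangle
\]
is, by orthonormality of the embeddings and independence of Gaussian entries, a zero-mean Gaussian with variance $\Theta(\sigma_0^2)$. The main subtlety, absent from the simply transitive analysis, is that when $\psi=(j,r,\varphi_{j,y})$ and $\psi'=(j',r',\varphi_{j',y'})$ have either the same $y$ or overlapping fibers, the two variables are correlated. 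I would split into cases by whether $(j,r)=(j',r')$ and, within the same neuron, by whether $y=y'$ or $y\neq y'$; in every case the difference $V^{(0)}_\psi-V^{(0)}_{\psi'}$ is itself Gaussian and, because fibers for fixed $y$ are disjoint across $j$ and because $n_G/n_Y=\Theta(\polylog d)$, a short covariance computation shows $\mathrm{Var}(V^{(0)}_\psi-V^{(0)}_{\psi'})\gtrsim \sigma_0^2/\polylog d$. Gaussian anti-concentration then yields $\Pr(|V^{(0)}_\psi-V^{(0)}_{\psi'}|\le \gamma)\lesssim \gamma/\sigma_0\cdot\polylog d$ for any $\gamma>0$; taking $\gamma=\Delta_0=\Omega(\sigma_0/\polylog d)$ small enough and union-bounding over the $O(n_y^4 m^2)=\polylog(d)$ pairs $(\psi,\psi')$ produces the claimed gap with high probability. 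I can then consistently order the features by their initial magnitudes, which is what the curriculum construction \Cref{def:learning-curriculum-symmetry} uses.

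\paragraph{Main obstacle.} The only genuinely new step relative to the simply transitive proof is controlling the correlation structure in (b): the fiber average introduces overlap across different $\psi\in\Psi$ whenever the values $y$ coincide or the group elements in two fibers intersect, so one must verify that the pairwise variance of $V^{(0)}_\psi-V^{(0)}_{\psi'}$ never collapses. Once this uniform variance lower bound $\sigma_0^2/\polylog d$ is in hand, the remainder is a standard Gaussian anti-concentration plus union bound.
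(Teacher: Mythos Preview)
Your proposal is correct and follows the same overall route as the paper's proof, which is itself extremely terse: it simply defers to the simply-transitive analogue (Fact~\ref{fact:range-parameter-simply}) for (a) and (b), and handles (c) by the one-line computation $\Lambda_{5,j,r}^{(0)}(\Zb)=b_{5,j,r}+\tfrac{1}{2}\sum_{\kk,p}\langle\Wb^{(0)}_{5,j,r,p},\Zb_{\kk,p}\rangle=\mu\pm O(\mu/\sqrt{\log d})$. Your treatment of (a) and (c) matches exactly.

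For (b), you actually go further than the paper by explicitly working through the correlation structure induced by the fiber averaging. This extra care is justified: the lemma as stated allows $\psi'\in\Psi$ rather than only $\psi'\in\Sigma^\star$, so $\psi$ and $\psi'$ can share a neuron, and in that regime the variance of the difference is genuinely smaller than in the simply-transitive case. Your identification of the ``same neuron, same $y$, disjoint fibers'' case as the bottleneck, with $\mathrm{Var}(V^{(0)}_\psi-V^{(0)}_{\psi'})=\Theta(\sigma_0^2\cdot n_Y/n_G)=\Theta(\sigma_0^2/\polylog d)$, is exactly right and is what forces the $\polylog d$ loss in $\Delta_0$. The paper's proof does not spell this out; it just invokes ``anti-concentration plus union bound'' by analogy. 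So your proposal is at least as complete as the paper's, and arguably more so on this point.
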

\begin{proof}
    The proof is basically the same as that of \cref{fact:range-parameter-simply}. The first one uses basic concentration properties of Gaussian distribution and the second uses a union bound with the anti-concentration property of the Gaussian distribution. (c) is based on the fact that 
    \[
        \Lambda_{5,j,r}^{(0)}(\Zb) = b_{5,j,r} + \frac{1}{2}\sum_{\kk\in\cI^{1,0}}\sum_{p\in[5]}\vbrack{\Wb_{5,j,r,p}^{(0)}, \Zb_{\kk,p}} = \mu \pm O(\mu/\sqrt{\log d})
    \]
    which concludes the proof.
\end{proof}

\begin{corollary}\label{coro:grad-approx-symmetry}
    For any \(\psi = (j,r,\varphi) \in \Sigma^\star\) and \(y \in \varphi^2\), we have with probability \(\geq 1 - o(1)\):
    \begin{align*}
        &\left|\frac{1}{|\varphi|}\sum_{g\in\varphi^1}\E_x[\ReLU'(\Lambda_{5,j,r}^{(0)})\mid \cH_{g,y}] - \E_x[\ReLU'(V_{j,r}^{(0)}(y) + V_{j,r}(x) + b_{5,j,r})]\right| \\
        &\leq  O(\frac{1}{\sqrt{n_G/n_Y}}) \E_x[\ReLU'(V_{j,r}^{(0)}(y) + V_{j,r}(x) + b_{5,j,r})]
    \end{align*}
\end{corollary}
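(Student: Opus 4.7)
}
The plan is to treat the average over $g \in \fiber_{j,y}$ as a concentration of independent Gaussian perturbations around a deterministic baseline. First I would expand the pre-activation at initialization: since $\Qb^{(0)}=0$ yields uniform attention on the two clauses of $Z^{1,0}$, and conditioning on $\cH_{g,y}$ fixes $g_1=g$ and $y_0=y$ while leaving the variables $x_0,x_1$ random, we have
\[
\Lambda_{5,j,r}^{(0)} \Big|_{\cH_{g,y}}
\;=\; \tfrac{1}{2}V_{j,r}^{(0)}(g) \;+\; \tfrac{1}{2}V_{j,r}^{(0)}(y) \;+\; \tfrac{1}{2}V_{j,r}(x) \;+\; b_{5,j,r},
\]
where $V_{j,r}^{(0)}(g)=\vbrack{\Wb^{(0)}_{5,j,r,2},e_g}$ and, crucially, the family $\{V_{j,r}^{(0)}(g)\}_{g \in \cG}$ consists of i.i.d.\ $\cN(0,\sigma_0^{2})$ random variables by the orthonormality of the embeddings (Def.~\ref{def:token-embedding}) and the Gaussian initialization. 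Write $\Xi_{j,r}^{(0)}(y,x):=V_{j,r}^{(0)}(y)+V_{j,r}(x)+b_{5,j,r}$ for the $g$-independent baseline, so the RHS of the corollary equals $\E_x[\ReLU'(\Xi_{j,r}^{(0)}(y,x))]$.

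Next I would Taylor-expand $\ReLU'$ around $\Xi_{j,r}^{(0)}(y,x)$. By Lemma~\ref{lem:init-range-feature-symmetry}(a), with probability $1-o(1)$ every $V_{j,r}^{(0)}(g)$ lies in $[-O(\sigma_0\sqrt{\log d}),O(\sigma_0\sqrt{\log d})]$, while $\Xi_{j,r}^{(0)}(y,x)=\Theta(\mu)$ by Lemma~\ref{lem:init-range-feature-symmetry}(c); in particular the argument stays in the smooth polynomial regime $(0,\varrho]$ of $\srelu$ where $\srelu''$ is polynomial and bounded by $O(\Xi^{q-2}/\varrho^{q-1})$. Since $\srelu'$ is $C^1$ on this regime, a second-order Taylor expansion gives
\[
\srelu'\bigl(\Xi_{j,r}^{(0)}(y,x)+\tfrac12 V_{j,r}^{(0)}(g)\bigr)
=\srelu'(\Xi_{j,r}^{(0)}(y,x))+\tfrac12 V_{j,r}^{(0)}(g)\,\srelu''(\Xi_{j,r}^{(0)}(y,x))+O\bigl(V_{j,r}^{(0)}(g)^{2}\,|\srelu'''|\bigr).
\]
Taking $\E_x$ and then averaging over $g\in\fiber_{j,y}$, the constant term reproduces exactly the RHS of the corollary; the linear term becomes $\tfrac12\Bigl(\tfrac{1}{|\varphi|}\sum_{g\in\fiber_{j,y}}V_{j,r}^{(0)}(g)\Bigr)\,\E_x[\srelu''(\Xi_{j,r}^{(0)}(y,x))]$; and the quadratic remainder is deterministically tiny.

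The heart of the argument is then a one-line concentration bound: because $\{V_{j,r}^{(0)}(g)\}_{g\in\fiber_{j,y}}$ are $|\varphi|=n_G/n_Y$ i.i.d.\ $\cN(0,\sigma_0^{2})$ samples, their empirical mean is $\cN(0,\sigma_0^{2}/|\varphi|)$, so a Gaussian tail bound yields with probability $1-o(1)$
\[
\Bigl|\tfrac{1}{|\varphi|}\textstyle\sum_{g\in\fiber_{j,y}} V_{j,r}^{(0)}(g)\Bigr|
\;\le\; O\bigl(\sigma_0\sqrt{\log d}/\sqrt{|\varphi|}\bigr).
\]
Combined with $|\srelu''(\Xi)|\le O(\srelu'(\Xi)/\Xi)$ in the polynomial regime and $\Xi=\Theta(\mu)=\Omega(\sigma_0\sqrt{\log d})$, the linear-term contribution is bounded by $O(1/\sqrt{|\varphi|})\cdot \E_x[\srelu'(\Xi_{j,r}^{(0)}(y,x))]$, and the quadratic remainder is of strictly smaller order. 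A final union bound over all $(j,r,y)$ triples (polynomially many in $d$) upgrades the statement to ``with probability $1-o(1)$'' simultaneously, finishing the proof. The main technical subtlety---and the step I would write out most carefully---is ensuring that $\srelu''$ and $\srelu'$ remain comparable uniformly in $x$ so that the concentration factor multiplies $\E_x[\srelu'(\Xi)]$ rather than some inflated quantity; this is guaranteed by the $\Theta(\mu)$ baseline from Lemma~\ref{lem:init-range-feature-symmetry}(c) and the polynomial form of $\srelu$ in Assumption~\ref{assump:modify-relu}.
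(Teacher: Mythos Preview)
Your approach is correct and is essentially a fleshed-out version of the paper's one-line proof, which simply invokes Lemma~\ref{lem:init-range-feature-symmetry}(a) and Hoeffding's inequality. Your Taylor expansion around the $g$-independent baseline $\Xi$, followed by Gaussian concentration on the empirical mean $\tfrac{1}{|\varphi|}\sum_{g}V_{j,r}^{(0)}(g)$, is precisely how one makes that Hoeffding argument explicit here.

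One small caution: your claim that the quadratic remainder is ``of strictly smaller order'' than the linear term is not automatic across all parameter regimes allowed by Assumption~\ref{assump:structure-2-restated}. The linear term is $O\bigl(1/\sqrt{(n_G/n_Y)\log d}\bigr)$ relative to the baseline, whereas the quadratic remainder is $O(1/\log d)$ (using $V_g^2\le O(\sigma_0^2\log d)$ and $\srelu'''(\Xi)=O(\srelu'(\Xi)/\Xi^2)$ with $\Xi=\Theta(\mu)$). Since $n_G/n_Y$ can be as large as $\polylog d$, the quadratic piece may in fact dominate the linear one and need not fit inside the stated $O(1/\sqrt{n_G/n_Y})$ constant. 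However, both terms are $o(1)$ multiplicatively, and the only downstream use of this corollary (in the competition lemma) requires merely a $(1\pm o(1))$ factor on $\Gamma^{+,(0)}_\psi$, so this does not affect the argument. The paper's terse proof glosses over the same point.
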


\begin{proof}
    This is true because of \cref{lem:init-range-feature-symmetry}a and Hoeffding's inequality.
\end{proof}

Now we present an induction for phase I.

\begin{induction}\label{induction:phase-I-symmetry}
    Let \(\psi = (j,r,\varphi)\in \Sigma^\star\), for \(t \leq T_{\psi}^1\), the gradients satisfy \(\Gamma_{j,r,y}^{+,(t)} \geq \Omega(\frac{n_G}{n_Y})\Gamma_{i,j,g}^{+,(t)}\) for any \((g,y)\in \varphi\).
\end{induction}

Using \Cref{fact:grad-computation-simply} with a representative \(\phi=(g,y)\in\varphi_{j,y}\) and the definition \(V_{j,r}(\phi)=\tfrac12(V_{j,r}(g)+V_{j,r}(y))\), a single gradient step with step size \(\eta\) yields
\begin{align}
    V_{j,r}^{(t+1)}(\phi) - V_{j,r}^{(t)}(\phi)
    &= \frac{\eta}{2}\Big( \vbrack{-\nabla_{\Wb_{5,j,r,5}}\Loss, e_y} + \vbrack{-\nabla_{\Wb_{5,j,r,2}}\Loss, e_g} \Big)\nonumber\\
    &= \frac{\eta}{2}\Big( \Gamma_{\psi}^{+,(t)} + \Gamma^{+,(t)}_{j,r,\phi} - \Gamma^{-,(t)}_{j,r,g} - \Gamma^{-,(t)}_{j,r,y}\Big)\label{eq:phase-I-one-step}
\end{align}

Similar to the proof of the simply transitive actions, we define a proxy term and argue that the trajectory of \(V_\psi\) is similar to the proxy.

\begin{lemma}[Approximating the gradient with proxy]\label{lem:grad-proxy-symmetry}
Assuming \cref{induction:symmetric-group-actions,induction:phase-I-symmetry} holds at \(t\). Let \(\psi = (j,r,\varphi) \in \Psi\), for every \(\phi = (g,y) \in \varphi\), we define two proxy sequences \(\wt V^{(t)}_{j,r}(v), v \in \{g,y\}\) by the update rule:
\[
    \wt V^{(t+1)}_{j,r}(y) = \wt V^{(t)}_{j,r}(y)+\tfrac{\eta}{2}\wt\Gamma^{+,(t)}_{\psi},\quad \wt V^{(t+1)}_{j,r}(g) = \wt V^{(t)}_{j,r}(g)+\tfrac{\eta}{2}\wt\Gamma^{+,(t)}_{j,r,g};
\]
where both \(\wt V_{j,r}(g)\) and \(\wt V_{j,r}(y)\) start from \(\wt V^{(0)}_{j,r}(y):=V^{(0)}_{j,r}(y),\ \widetilde V^{(0)}_{j,r}(g):=V^{(0)}_{j,r}(g)\). And the \(\wt\Gamma\) are defined by replacing the features of \(V_{j,r}(v), v\in\varphi^1\cup\{y\}\) with \(\wt V_{j,r}(v)\). Now for all \(t \leq T_{\psi}^1\), we have
\[
    |\wt V^{(t)}_{j,r}(\phi) - V^{(t)}_{j,r}(\phi)| \leq \tO(\mu/d^{\frac{1}{2}-3c_1}), \quad \text{ where } \wt V^{(t)}_{j,r}(\phi) = \wt V_{j,r}(g) + \wt V_{j,r}(y).
\]
\end{lemma}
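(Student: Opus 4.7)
The plan is to mirror the argument used for the simply transitive case in \Cref{lem:grad-proxy-simply}, but adapted to the asymmetric growth of $V_{j,r}(g)$ versus $V_{j,r}(y)$ in the symmetric setting. Both proxy sequences are monotonically non-decreasing since $\widetilde\Gamma^{+,(t)}_{\psi}, \widetilde\Gamma^{+,(t)}_{j,r,g}\geq 0$; moreover, by \Cref{induction:phase-I-symmetry} the proxy of $V_{j,r}(y)$ grows at a rate that is $\Omega(n_G/n_Y)$ times that of any single $V_{j,r}(g)$, which correctly tracks the fiber aggregation in the true dynamics. Applying \Cref{lem:tpm-in-expectation,lem:TPM} from the common Gaussian initialization guaranteed by \Cref{lem:init-range-feature-symmetry} then shows that $\widetilde V_\psi^{(t)}$ reaches the threshold $d^{c_1}\mu$ within $\widetilde O(1/(\eta \mu^{q-2}))$ iterations, matching the definition of $T_\psi^1$.

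Next I would expand the one-step discrepancy at iteration $t$ using the exact update in \eqref{eq:phase-I-one-step} against the proxy update, and decompose the resulting error into three sources following the simply transitive template: (A) an activation-perturbation term $\eta|\mathbb{E}[(\mathbf{sReLU}'(\widetilde\Lambda_{5,j,r}) - \mathbf{sReLU}'(\Lambda_{5,j,r}))\mathbf{1}_{\mathcal{H}_{\varphi}}]|$, which by the $(q{-}2)$-power smoothness of $\mathbf{sReLU}'$ on $[0,\varrho]$ and the fact (\Cref{lem:init-range-feature-symmetry}(c), \Cref{induction:symmetric-group-actions}(a)) that both activations live in this regime during Phase~I is proportional to $|\widetilde V - V|$ times $\mathbb{E}[\widetilde\Lambda^{q-2}\mathbf{1}_{\mathcal{H}_{\varphi}}]$; (B) a large-logit term $\eta\,\mathbb{E}[\mathrm{logit}^{(t)}_{5,j}\mathbf{sReLU}'(\Lambda_{5,j,r})\mathbf{1}_{\mathcal{H}_{\varphi}}]$, which is non-trivial only on an exceptional set of cardinality $\leq \widetilde O(d^{1/2+c_1}/\eta)$ by \Cref{induction:symmetric-group-actions}(d) and elsewhere is $O(d^{-1/2})$ times the proxy gradient; and (C) the mismatched-event terms $\Gamma^{-,(t)}_{j,r,g}, \Gamma^{-,(t)}_{j,r,y}$ on $\mathcal{H}^{\dagger}_{\varphi}$, which are dominated by $O(1/d)$ times the positive gradient because $\mathrm{logit}_{5,j} = O(1/d)$ on $\mathcal{H}^{\dagger}_{\varphi}$ throughout Phase~I. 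Summing (B) and (C) over all iterations $t \leq T_\psi^1$ produces an accumulated seed error $\delta_0 \leq O(\mu/d^{1/2-c_1})$.

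After folding the seed error, I would run the logarithmic-slicing scheme used in \Cref{lem:grad-proxy-simply}: pick $\delta' \gg \eta$ small, set $t_i = \min\{t : \widetilde V_\psi^{(t)} \geq (1+\delta')^i\mu\}$, and telescope the multiplicative perturbation
\[
\log\bigl(\delta_{t+1}/\delta_0\bigr) \;\leq\; \sum_i O\bigl((1+\delta')^{-i}/\mu\bigr)\bigl(\widetilde V_\psi^{(t_{i+1})} - \widetilde V_\psi^{(t_i)}\bigr) \;\leq\; 2 \log\bigl(\widetilde V_\psi^{(t)}/\mu\bigr),
\]
which yields $|V_{j,r}^{(t)}(v) - \widetilde V_{j,r}^{(t)}(v)| \leq \widetilde O(\mu/d^{1/2-3c_1})$ for each $v \in \{g,y\}$ throughout $t \leq T_\psi^1$. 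Since $V_{j,r}(\phi) = \tfrac{1}{2}(V_{j,r}(g)+V_{j,r}(y))$, the bound on $\phi$ follows by the triangle inequality.

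The main obstacle, and the key departure from the simply transitive proof, is handling the coupling across fiber elements: the gradient driving $V_{j,r}(y)$ aggregates $\Gamma^{+}_{j,r,g'}$ over all $g' \in \mathrm{fib}_{j,y}$, so a naive worst-case bound on each $|V_{j,r}(g') - \widetilde V_{j,r}(g')|$ would blow up by a factor of $n_G/n_Y$ when summed. To avoid this inflation I would invoke \Cref{coro:grad-approx-symmetry}, which by the independence of the $(x_0,x_1)$ randomness from the fiber label shows that $\tfrac{1}{|\mathrm{fib}_{j,y}|}\sum_{g'\in \mathrm{fib}_{j,y}} \mathbb{E}_x[\mathbf{sReLU}'(\Lambda_{5,j,r})\mid \mathcal{H}_{g',y}]$ concentrates to its fiber-average up to a $(1\pm O(\sqrt{n_Y/n_G}))$ multiplicative factor. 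This allows the coupled ReLU$'$-perturbation contribution to be controlled by the fiber-averaged discrepancy instead of the worst-case one, closing the recursion without the $n_G/n_Y$ loss and delivering the stated $\widetilde O(\mu/d^{1/2-3c_1})$ bound.
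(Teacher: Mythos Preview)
Your overall structure is right and matches the paper: you correctly identify the three error sources (A)--(C), recognize that the fiber aggregation is the new difficulty, and propose the same logarithmic-slicing recursion as in \Cref{lem:grad-proxy-simply}. The seed-error estimate $\delta_0 \le O(\mu/d^{1/2-c_1})$ from (B) and (C) is correct for the same reasons as in the simply transitive case.

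Where your argument and the paper's differ is in how the fiber coupling is resolved. The paper does \emph{not} run the recursion on individual $\delta_g,\delta_y$; instead it tracks the single scalar aggregate $V_{j,r}(y)+\sum_{g\in\varphi^1}V_{j,r}(g)$ versus its proxy, applies the simply-transitive slicing argument verbatim to that scalar, and only afterward extracts the individual bound on $|\widetilde V_{j,r}(y)-V_{j,r}(y)|$ using the growth-ratio relation $\widetilde V_{j,r}^{(t)}(g)-\widetilde V_{j,r}^{(0)}(g)\le O(n_Y/n_G)\bigl(\widetilde V_{j,r}^{(t)}(y)-\widetilde V_{j,r}^{(0)}(y)\bigr)$ (so the fiber sum of $g$-increments is $O(1)$ times the $y$-increment). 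This sidesteps the coupling entirely.

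Your route via \Cref{coro:grad-approx-symmetry} has a small gap as written. Concentration of the per-$g$ expectations $\E_x[\mathbf{sReLU}'(\Lambda)\mid\mathcal H_{g,y}]$ lets you pull out a common factor, but after doing so the perturbation in $\Gamma^+_\psi$ still reads $(\text{common value})\cdot\Pr(\mathcal H_{(g,y)})\cdot\bigl[(\sum_g\delta_g)+|\varphi^1|\,\delta_y\bigr]$, and you must still control $\sum_g\delta_g$ against $\delta_y$ to prevent the $|\varphi^1|=n_G/n_Y$ inflation. That bound does \emph{not} come from the concentration; it comes from the growth ratio in \Cref{induction:phase-I-symmetry} (each $\delta_g$ accrues at rate $O(n_Y/n_G)$ times $\delta_y$, so $\sum_g\delta_g\lesssim\delta_y$). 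You state this ratio in your first paragraph but do not invoke it at the point where it is needed. Once you insert it, your per-component recursion closes and yields the same bound as the paper's aggregate approach.
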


\begin{proof}
    Fix \(\psi=(j,r,\varphi_{j,y})\). \eqref{eq:phase-I-one-step} gives
    \[ 
        | \wt V^{(t)}_{j,r}(y)-V^{(t)}_{j,r}(y)| = \tfrac{\eta}{2}\Gamma^{-,(t)}_{j,r,y},\qquad | \wt V^{(t)}_{j,r}(g)-V^{(t)}_{j,r}(g)| = \tfrac{\eta}{2}\Gamma^{-,(t)}_{j,r,g}
    \]
    Since \cref{induction:phase-I-symmetry} holds at \(t\), we have that 
    \(\wt V^{(t+1)}_{j,r}(y) - \wt V^{(t)}_{j,r}(y) \geq \Omega(\frac{n_G}{n_Y}) (\wt V^{(t+1 )}_{j,r}(g) -\wt V^{(t)}_{j,r}(g))\).
    Moreover, since \(\sum_{g\in \varphi^1}\Gamma^{+,(t)}_{j,r,g} = \Gamma^{+,(t)}_{j,r,y}\) by \cref{def:gamma-notation-symmetry}. This, combined with \cref{induction:symmetric-group-actions}, implies that 
    \[
        \wt V^{(t+1)}_{j,r}(y) - \wt V^{(t)}_{j,r}(y) \geq \sum_{g \in \varphi^1}\Gamma^{+,(t)}_{j,r,g} \geq \Omega(\frac{n_G}{n_Y})(\wt V^{(t+1)}_{j,r}(g) - \wt V^{(t)}_{j,r}(g)), \quad \forall g\in\varphi^1
    \]
    The above update bound imply, for all \(t\leq T_{\psi}^1\), the following property hold: \(\wt V_{j,r}^{(t)}(g) - \wt V_{j,r}^{(0)}(g)\leq O(\frac{1}{n_G/n_Y})(\wt V_{j,r}^{(t)}(y)- \wt V_{j,r}^{(0)}(y))\) for all \(g \in \varphi^1\). Now, by the same induction procedure in the proof of \Cref{lem:grad-proxy-simply}, we know that the sum
    \begin{align*}
        \left| V^{(t)}_{j,r}(y) +\sum_{g\in\varphi^1}V_{j,r}^{(t)}(g) -  \wt V^{(t)}_{j,r}(y) +\sum_{g\in\varphi^1}\wt V_{j,r}^{(t)}(g)\right| \leq O(\mu/d^{\frac{1}{2}-3c_1})
    \end{align*}
    And then by the above relations we know 
    \[
        |\wt V^{(t)}_{j,r}(y) - V^{(t)}_{j,r}(y)|\leq O(\mu/d^{\frac{1}{2}-3c_1})
    \]
    Thus the conclusion holds.
\end{proof}

\begin{lemma}[competition]\label{lem:competition-symmetry}
    Let \(\psi \in \Sigma^\star\) Assuming \cref{induction:symmetric-group-actions,induction:phase-I-symmetry} holds at \(t \leq T_{\psi}^1\). Then at \(t = T_{\psi}^1\), we have 
    for any feature \(\psi'\in \Psi\), \(\psi' \succ \psi\), \(\overline{V}_{\psi'}^{(t)} \leq \tO(\mu)\).
\end{lemma}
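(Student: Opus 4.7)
The plan is to mirror the strategy of Lemma B.6 in the simply transitive case (\Cref{lem:competition-simply}), but using the fiber-indexed proxies constructed in \Cref{lem:grad-proxy-symmetry}. First, I would reduce to the proxy dynamics: for every base pair $\phi=(g,y)\in\varphi$ appearing in any feature $\psi'=(j,r,\varphi)\in\Psi$, the difference $|\wt V_{j,r}^{(t)}(\phi)-V_{j,r}^{(t)}(\phi)|$ is bounded by $\tO(\mu/d^{1/2-3c_1})$ throughout Phase I by \Cref{lem:grad-proxy-symmetry}, so it suffices to show $\overline{\wt V}_{\psi'}^{(T_\psi^1)}\le \tO(\mu)$. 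The proxy sequences are monotone and decoupled in the sense that the update of $\wt V_{j,r}(y)$ is driven by the aggregate gradient $\wt\Gamma^{+,(t)}_{\psi'}$, which under \Cref{induction:phase-I-symmetry} and \Cref{coro:grad-approx-symmetry} is well-approximated by an expectation of $\ReLU'(\wt V_{j,r}(y)+V_{j,r}(x)+b)$ up to a $(1\pm o(1))$ factor; this places each proxy into the tensor-power-method template.

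Next, I would invoke the initialization gap in \Cref{lem:init-range-feature-symmetry}(b), which gives $V_{\psi}^{(0)}\ge V_{\psi'}^{(0)}+\Omega(\sigma_0/\polylog d)$ for every $\psi'\succ\psi$ in $\Sigma^\star$. Applying \Cref{lem:tpm-in-expectation} (TPM-in-expectation) to the pair of proxies $\wt V_\psi,\wt V_{\psi'}$, the initial multiplicative gap of order $1+\Omega(1/\polylog d)$ is amplified by the $q$th-power dynamics so that when $\wt V_\psi^{(t)}$ first crosses $d^{c_1}\mu$, its competitor still satisfies $\wt V_{\psi'}^{(t)}\le \tO(\mu)$. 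Combined with the proxy-to-truth bound, this yields $V_{\psi'}^{(T_\psi^1)}\le \tO(\mu)$ for $\psi'\in\Sigma^\star$ with $\psi'\succ\psi$.

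To extend from $\Sigma^\star$ to the rest of $\Psi$, I would split $\psi'=(j',r',\varphi')\in\Psi\setminus\Sigma^\star$ by which exclusion rule they fell out on in \Cref{def:learning-curriculum-symmetry}. For $\psi'\in\Sigma^{\dagger,2}_{\psi''}$ or $\Sigma^{\dagger,3}_{\psi''}$ for some already-learned $\psi''\prec\psi$ in the curriculum, the feature shape guarantee $\cF_{\psi''}(\cdot)$ from \Cref{induction:symmetric-group-actions}(c) forces $\E[\cE_{5,j'}\1_{\cH_{\varphi'}}]$ to be $\tO(\lambda)\cdot|\Gamma_{\psi''}|$, so the positive gradient $\Gamma^{+,(t)}_{\psi'}$ is dominated and the same TPM comparison applies; for $\psi'\in\Sigma^{\dagger,1}_{\psi''}$ (confounders sharing a single token with a learned combination), the shared-token contribution is controlled by $\cF_{\psi'',2}$ and the same argument goes through. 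The remaining indices that are comparable to $\psi$ but lost the competition are handled directly by the gap at initialization.

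The step I expect to be the main obstacle is verifying that the aggregated gradient $\wt\Gamma^{+,(t)}_{\psi'}$ genuinely behaves like the scalar TPM recursion $x_{t+1}=x_t+\eta C_t x_t^{q-1}$ uniformly over $g\in\varphi'^1$: because the symmetric setting has $|\fiber|=(n_Y-1)!$ correlated group elements in a single fiber, one must check that the maximum $\overline{V}_{\psi'}$ does not secretly outrun the average $V_{\psi'}$ through a single lucky coordinate. This requires combining Hoeffding over $g\in\fiber_{j',y'}$ (as in \Cref{coro:grad-approx-symmetry}) with a uniform control that $\max_g V_{j',r'}^{(t)}(g,y')-V_{\psi'}^{(t)}\le O(\sigma_0\sqrt{\log d})$ throughout Phase I, so that the max remains within the same TPM bucket as the average and the final bound $\overline{V}_{\psi'}^{(T_\psi^1)}\le \tO(\mu)$ is preserved.
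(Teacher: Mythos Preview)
Your proposal is correct and follows essentially the same approach as the paper: reduce to the proxy dynamics via \Cref{lem:grad-proxy-symmetry}, exploit the initialization gap from \Cref{lem:init-range-feature-symmetry}(b), and apply \Cref{lem:tpm-in-expectation}. The one place the paper is sharper is precisely your anticipated obstacle. Rather than bounding $\overline{V}_{\psi'}-V_{\psi'}$ via a separate concentration argument over the fiber, the paper reduces the TPM comparison to the $y$-component alone: by \Cref{coro:grad-approx-symmetry} the aggregate update $\wt\Gamma^{+}_{\psi}$ equals $(1\pm 1/\sqrt{n_G/n_Y})\cdot\tfrac{1}{n_Y}\E_x[\ReLU'(\wt V_{j,r}(y)+b_{5,j,r}+V_{j,r}(x))]$, so one compares the scalars $\wt V_{j,r}(y)+b_{5,j,r}$ and $\wt V_{j',r'}(y')+b_{5,j',r'}$ directly and obtains $V_{j',r'}(y')\le\tO(\mu)$. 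Then \Cref{induction:phase-I-symmetry} gives $\Gamma^{+}_{j',r',g}\le O(n_Y/n_G)\,\Gamma^{+}_{j',r',y}$ for every $g$ in the fiber, so each $V_{j',r'}(g)$ has grown by at most an $O(n_Y/n_G)$ fraction of what $V_{j',r'}(y')$ has, and $\overline V_{\psi'}\le\tO(\mu)$ follows immediately without tracking the max separately. Your Hoeffding-based route would also work but is more laborious; the paper's argument exploits the asymmetry between $g$ and $y$ updates that is specific to the symmetric-action setting.
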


\begin{proof}
    From \cref{lem:grad-proxy-symmetry}, we know that 
    \[
        \wt V^{(t+1)}_{j,r}(y) = \wt V^{(t)}_{j,r}(y) \geq \frac{\eta}{2}\sum_{g \in \varphi^1}\E_x[(\wt V^{(t)}_{j,r}(y) + \wt V^{(t)}_{j,r}(g) + b_{5,j,r} + V_{j,r}(x))^{q-1}\1_{\cH_{g,y}}]
    \]
    Suppose there is another \(\psi' = (\varphi', y') \in \Sigma^\star\) such that \(\psi'\succ \psi\), then we can also write a comparison based on \cref{lem:init-range-feature-symmetry}:
    \begin{align*}
        \wt V^{(0)}_{j,r}(y) \geq \wt V^{(0)}_{j,r}(y') + \Delta_0, 
    \end{align*}
    And therefore one can compute by absolutely bounding the average over \(g \in \varphi\) using \cref{coro:grad-approx-symmetry}:
    \begin{align*}
        \Gamma_{\psi}^{+,(0)} = \ & \frac{1}{n_Y|\varphi^1|}\sum_{g \in \varphi^1}\E_x[(\wt V^{(0)}_{j,r}(y) + \wt V^{(0)}_{j,r}(g) + b_{5,j,r} + V_{j,r}(x))^{q-1}] \\
        \geq \ & \frac{1}{n_Y}(1 - \frac{1}{\sqrt{n_G/n_Y}})\E_x[(\wt V^{(0)}_{j,r}(y) + b_{5,j,r} + V_{j,r}(x))^{q-1}]
    \end{align*}
    Similarly, for \(\psi' = (j',r',\varphi')\) and \(\phi'\in\varphi'\), we have
    \begin{align*}
        \Gamma_{\psi'}^{+,(0)} = \ & \frac{1}{|\varphi^{\prime,1}|}\sum_{g \in \varphi^{\prime,1}}\E_x[(\wt V^{(0)}_{j',r'}(y') + \wt V^{(0)}_{j',r'}(g) + b_{5,j',r'} + V_{j',r'}(x))^{q-1} \mid \cH_{g,y'}] \\
        \leq \ & (1 + \frac{1}{\sqrt{n_G/n_Y}}) \E_x[(\wt V^{(0)}_{j',r'}(y') + b_{5,j',r'} + V_{j',r'}(x))^{q-1}]
    \end{align*}
    Now we can compare the proxies \(\wt{V}_{j,r}(y) + b_{5,j,r}\) with \(\wt{V}_{j,r}(y')\) using \cref{lem:tpm-in-expectation} to get that when \(V_{j,r}(y) \geq d^{c_1}\mu\) we have \(V_{j',r'}(y')\leq \tO(\mu)\). Now due to \cref{lem:grad-proxy-symmetry} and \cref{induction:phase-I-symmetry}, we have the desired result.
\end{proof}

\begin{proof}[Proof of \Cref{induction:symmetric-group-actions,induction:phase-I-symmetry} in Phase I]
Fix \(\psi=(j,r,\varphi)\in\Sigma^{\star}\). We check items (a)-(d) in \Cref{induction:symmetric-group-actions}:
\begin{itemize}[leftmargin=2em]
    \item \cref{induction:symmetric-group-actions}a: This is simple by combining \cref{lem:grad-proxy-symmetry,lem:init-range-feature-symmetry}, where each \(V_{j,r}(\phi)\) is well approximated by a monotonically growing proxy \(\wt V_{j,r}(\phi)\).
    \item \cref{induction:symmetric-group-actions}b: This is proved later but here \cref{lem:competition-symmetry} showed that for any pair \(\psi\prec \psi' \in \Sigma^\star\), that \(T_{\psi}^1 < T_{\psi'}^1\), which is a precondition for \cref{induction:symmetric-group-actions}b.
    \item \cref{induction:symmetric-group-actions}c-d: These two are proved later in future phases and are not violated in phase I.
    \item \cref{induction:phase-I-symmetry}: We shall show that this is the case. Consider any feature \(g \in \varphi^1\), its gradient is clearly bounded by \(O(n_Y/n_G)\Gamma_{\psi}^{+,(t)}\) initially before the feature \(V_{j,r}(g) \geq \Omega(\mu)\). In this range the difference of its growth with the update of \(V_{j,r}(y)\) averaged over all \(g \in \varphi^1\) is bounded by \(O(n_Y/n_G)(\wt V_{j,r}^{(t)}(y) + b_{5,j,r})^{q-1}\) because it grows slower than \(O(n_Y/n_G) \wt V_{j,r}(y)\) and thus Hoeffding's inequality with variance parameter \(\mu\) would work to bound the difference between the averaged gradient and the gradient with \(V_{j,r}(g)\) in expectation, and thus we obtain a bound \(\Gamma_{\psi}^{+,(t)}\geq \Omega(n_G/n_Y)\Gamma_{j,r,g}^{+.(t)}\). When \(V_{j,r}(g) \geq \Omega(\mu)\), the feature \(V_{j,r}(y)\) has grown more than \(\Omega(n_G/n_Y)\mu\) and the growth ratio still holds. This proved the induction for \(t\leq T_{\psi}^1\).
\end{itemize}
\end{proof}

\subsection{Phase II: Feature Growth and Cancellations}

Below we present an additional induction hypothesis for phase II.
\begin{induction}[Phase II]\label{induction:phase-II-symmetry}
    For all features \(\psi=(j,r,\varphi) \in \Sigma^\star\) and \(t \in [T^1_{\psi}, T^{2}_{\psi}]\), then for any \(g'\notin\varphi^1, y'\notin \varphi^2\), we have \(V_{j,r}^{(t)}(g'), V_{j,r}^{(t)}(y') \in (-B - \tO(\mu), \tO(\mu))\).
\end{induction}

\subsubsection{Technical Lemmas}

\begin{lemma}[gradient estimation]\label{lem:grad-estimate-phase-2-symmetry}
    Assume \cref{induction:symmetric-group-actions,induction:phase-II-symmetry}. Let \(\psi=(j,r,\varphi_{j,y}) \in \Sigma^\star\), let \(\phi = (g,y)\in\varphi_{j,y}\), then at \(t \in [T^{1}_{\psi}, T^{2}_{\psi}]\), we have
    \begin{enumerate}[(a)]
        \item For \(t \in [T^{1}_{\psi}, T^{2,1}_{\psi}]\), we have \(\Gamma_{\psi}^{(t)} \geq \Omega(\Pr(\cH_\varphi))\ReLU'(\underline{V}_\psi^{(t)})\);
        \item If \(V_{j,r}^{(t)}(\phi) \in (\frac{1}{3}\log d,\frac{2}{3}\log d)\), then \(\Gamma_{j,r,\phi}^{(t)} = \Gamma_{j,r,\phi}^{+,(t)} - \Gamma_{j,r,\phi}^{-,(t)}  \geq \Omega(\Pr(\cH_\phi))\);
        \item For any \(g'\notin \varphi^1\) and any \(y'\neq y\),
        \begin{align*}
            \Gamma_{j,r,g'}^{(t)}  &= - \E\big[\logit_{5,j}^{(t)}\ReLU'(\Lambda_{5,j,r}^{(t)})\1_{\cH_{(g',y)}}\big] \pm O(\varpi n_Y/n_G) , \\
            \Gamma_{j,r,y'}^{(t)}  &= - \sum_{g\in\varphi^1} \E\big[\logit_{5,j}^{(t)}\ReLU'(\Lambda_{5,j,r}^{(t)})\1_{\cH_{(g,y')}}\big]  \pm O(\varpi).
        \end{align*}
        \item Any individual component \(V_{j,r}^{(t)}(v), v \in \varphi^1\cup\varphi^2\) cannot fall below \(O(\mu)\), and \(\Gamma_{j,r,v}^{+,(t)}\geq 0\).
    \end{enumerate}
\end{lemma}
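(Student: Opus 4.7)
The plan is to mirror the strategy from the analogous lemma in the simply transitive case (Lemma~\ref{lem:grad-estimate-phase-2-simply}), with care taken to handle the fiber structure that groups $n_G/n_Y$ base pairs into each super-combination $\varphi_{j,y}$. The unifying idea across all four parts is to exploit two sources of control on the pre-activation $\Lambda_{5,j,r}^{(t)}$: (i) the stable feature shapes of predecessor combinations given by Induction~\ref{induction:symmetric-group-actions}c, which bound the contribution to $F_{5,j}^{(t)}$ from already-learned fibers, and (ii) the monotone-positivity guarantee of Induction~\ref{induction:symmetric-group-actions}a together with the upper bound from Induction~\ref{induction:phase-II-symmetry}, which confines every component $V_{j,r}^{(t)}(v)$ to a controlled window $(-B-\tO(\mu), \tO(\mu))$ for off-target $v$ and to a positive window for on-target $v$.

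For part (a), I would use $t \leq T_{\psi}^{2,1}$ to conclude $V_\psi^{(t)} \leq \tfrac{1}{2}\log d$ and, combined with the predecessor shape guarantee, deduce $F_{5,j}^{(t)}(\Zb) \leq \tfrac{1}{2}\log d + o(1)$ under $\cH_\varphi$, so $1-\logit_{5,j}^{(t)} \geq \Omega(1)$. Expanding $\Gamma_\psi^{+,(t)}$ as a sum over $g \in \varphi^1$ and lower bounding each pre-activation by $\underline V_\psi^{(t)} + b_{5,j,r} + O(\mu)$ yields $\Gamma_\psi^{+,(t)} \geq \Omega(\Pr(\cH_\varphi))\,\ReLU'(\underline V_\psi^{(t)})$; the negative term $\Gamma_\psi^{-,(t)}$ is subdominant since Induction~\ref{induction:phase-II-symmetry} pushes the pre-activation under $\cH_\varphi^\dagger$ below $\varrho$, which places $\ReLU'$ in the flat region bounded in magnitude by $\varpi$. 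For part (b), when $V_{j,r}^{(t)}(\phi) \in (\tfrac13\log d,\tfrac23\log d)$, the same predecessor-shape argument gives two-sided logit bounds $\logit_{5,j}^{(t)} \in [\Omega(d^{-c_1}), 1-\Omega(d^{-c_1})]$ conditional on $\cH_\phi$. Because the activation sits in the linear regime $\ReLU'=1$ and $\Gamma_{j,r,\phi}^{-,(t)}$ sees only suitably small logit weight, the net gradient is $\Omega(\Pr(\cH_\phi))$.

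For part (c), I would split $\Gamma_{j,r,g'}^{(t)}$ according to the value of $y_0$, isolate the $y_0 = y$ term as the leading negative contribution, and argue the rest absorb into $O(\varpi n_Y/n_G)$. Since $g'\notin\varphi^1$, Induction~\ref{induction:phase-II-symmetry} keeps the feature pair associated with the correct target of $g'$ in the controlled window, forcing $\ReLU'$ under the positive-gradient event to lie in $[-\varpi, O(\mu^{q-1})]$; averaging over the fiber of size $n_G/n_Y$ then provides the stated error. The remaining $y_0 = y'' \neq y$ negative terms have pre-activations bounded below $-\varrho$ and contribute at most $\varpi$ each, with $\Pr(\cH_{(g',y'')}) = O(1/(n_Gn_Y))$, summing to $O(\varpi n_Y/n_G)$. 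The $y'\neq y$ case is analogous, except $\cH^\dagger_\varphi(y')$ pairs $y'$ with every $g \in \cG$, so a further sum over $g \in \varphi^1$ is needed, producing the additive error $O(\varpi)$ without the $n_Y/n_G$ factor. For part (d), $\Gamma_{j,r,v}^{+,(t)} \geq 0$ follows because Induction~\ref{induction:symmetric-group-actions}a guarantees $\Lambda_{5,j,r}$ is positive on $\cH_\phi$ for $\phi \in \varphi$, making $\ReLU' > 0$ there. The no-drop-below-$O(\mu)$ claim is then a monotone-descent argument: $V_{j,r}^{(t)}(v)$ can only decrease via the negative part $-\Gamma^{-}_{j,r,v}$, which by part (c) is small, and whenever $V_{j,r}^{(t)}(v) + b_{5,j,r}$ approaches $\mu$ the positive part $\Gamma^+$ of order at least $\Omega(\Pr(\cH_\varphi)\mu^{q-1})$ reasserts itself, preserving the bias-buffered positivity.

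The main obstacle is controlling cross-fiber interference in part (c). In the simply transitive case each $(g',y'')$ pair has a unique target class, so the sum $\sum_{y''\ne y}\E[\logit_{5,j}\ReLU'(\Lambda)\1_{\cH_{(g',y'')}}]$ decomposes into independent negligible terms. Under the symmetry action many $g'$ can send a fixed $y''$ to $\tau^{-1}(j)$, so multiple fibers coexist at class $j$ and some may sit ahead of $\psi$ in the curriculum with larger magnitudes. The $O(\varpi n_Y/n_G)$ error requires a careful partitioning of the $y''\neq y$ sum by the learning status of $\fiber_{j,y''}$ (handled via Induction~\ref{induction:symmetric-group-actions}c) combined with a Hoeffding-type concentration over the $(n_G/n_Y)$-sized fibers, which is what turns the pointwise $\varpi$ bound into the desired averaged error.
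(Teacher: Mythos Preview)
Your approach for parts (a) and (b) matches the paper's essentially verbatim: bound $F_{5,j}^{(t)}$ via predecessor feature shapes to get $1-\logit_{5,j}\ge\Omega(1)$ on $\cH_\varphi$, then read off the positive gradient; in (b) use that $\Lambda$ sits in the linear regime so $\ReLU'=1$.

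For part (c) your decomposition over $y_0$ is correct, but you overcomplicate the error analysis. The ``cross-fiber interference'' you flag as the main obstacle is not an obstacle at all: Induction~\ref{induction:phase-II-symmetry} bounds $V_{j,r}(g'),V_{j,r}(y'')\in(-B-\tO(\mu),\tO(\mu))$ for \emph{this} neuron $(j,r)$ regardless of what any other neuron $(j,r')$ has learned, so under every $\cH_{(g',y'')}$ with $y''\ne y$ the pre-activation is $\le\tO(\mu)$ and $|\ReLU'(\Lambda)|\le\max\{\varpi,\tO(\mu^{q-1})\}=\varpi$. The $O(\varpi n_Y/n_G)$ bound then drops out from $\Pr(\cH_{(g',y'')})=1/(n_Gn_Y)$ summed over $\le n_Y$ values of $y''$---no fiber averaging, no Hoeffding. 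The paper does exactly this direct computation. So your proposed concentration step is unnecessary and could be dropped entirely.

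For part (d) your restoring-force picture differs from the paper. You argue that when $V_{j,r}(v)$ nears $\mu$, the positive gradient $\Omega(\Pr(\cH_\varphi)\mu^{q-1})$ pushes it back up, citing part (c) to control $\Gamma^-$. But (c) is stated for $v\notin\varphi^1$, while (d) concerns $v\in\varphi^1\cup\varphi^2$, so the citation is misplaced. The paper instead argues: when $V_{j,r}(v)\le\tO(\mu)$, every off-target pair $(v,v')$ with $v'\notin\varphi^1\cup\varphi^2$ has pre-activation $\le\tO(\mu)$ (both components small), placing $\ReLU'$ in the polynomial regime with $|\ReLU'|\le O(\mu^{q-1})$; integrating this $O(\mu^{q-1})$ negative drift over the entire Phase~II interval of length $\tO(1/(\eta(d^{c_1}\mu)^{q-2}))$ yields total decrease $\ll\mu$. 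This time-integration argument is cleaner than balancing $\Gamma^+$ against $\Gamma^-$ pointwise and avoids the citation issue.
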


\begin{proof}
    Write \(\psi=(j,r,\varphi_{j,y})\).
    \begin{enumerate}[(a)]
        \item By the definition of \( T_{\psi}^{2,1}\), we know there exist \(\phi \in \varphi\) such that \(V_{j,r}(\phi) \leq \tfrac{1}{2}\log d\). By the same arguments in \cref{lem:grad-proxy-symmetry}, we can get that all \(\phi\in\varphi\) satisfy \(\tfrac{1}{2}\log d + o(1)\). Given that \cref{induction:symmetric-group-actions} holds at \(t\), conditioned on \(\cH_\phi\), all other neuron \((j',r')\neq (j,r)\) has activation smaller than \(\varrho\). Therefore \(F_{5,j'} \leq o(1)\), and \(\logit_{5,j'} = o(1)\) for any \(j'\neq j\) conditioned on \(\cH_\phi\). So conditioned on \(\Zb \in \cH_\phi\), when \(V_{j,r}^{(t)}(\phi) \leq \frac{2}{3}\log d\), we have 
        \begin{align*}
            \logit_{5,j} = \frac{e^{F_{5,j}(\Zb)}}{e^{F_{5,j}(\Zb)} + (1 + o(1))(d - 1)} \leq O(d^{-\Omega(1)})
        \end{align*}
        Now we can simply bound all \(\Gamma_{j,r,v}^{-,(t)}\) for \(v \in \varphi^1\cup\varphi^2\) to be smaller than \(\tO(\frac{1}{\sqrt{d}})\Gamma_{j,r,v}^{+,(t)}\). Thus the proof is obtained by computing \(\Gamma_{j,r,v}^{+,(t)}\) and sum to get \(\Gamma_{\psi}^{+,(t)}\).
        \item Similar to (a), here we also notice that \(\ReLU'(\Lambda_{5,j,r}(\Zb)) = 1 \) because \(\Lambda_{5,j,r}(\Zb) \in [\varrho,B]\) is in the linear regime. Now we can see that the terms \(\Gamma_{j,r,\phi}^{+,(t)} \geq \Omega(\Pr(\cH_\phi))\) and \(\Gamma_{j,r,\phi}^{-,(t)}\leq O(d^{-\Omega(1)})\) and thus we have the result.
        \item For \(g'\notin \fiber_{j,y}\), let \(y'\in\cY\) such that \(g' \in \fiber_{j,y'}\), then we have
        \begin{align*}
            \Gamma^{(t)}_{j,r,g'}&=\E[(1-\logit_{5,j})\ReLU'(\Lambda_{5,j})\1_{\cH_{(g',y')}}]-\sum_{y'\neq y',y}\E[\logit_{5,j} \ReLU'(\Lambda_{5,j})\1_{\cH_{(g',y')}}] \\
            &\quad  -\E[\logit_{5,j} \ReLU'(\Lambda_{5,j})\1_{\cH_{(g',y)}}]
        \end{align*}
        Since \(\cH_{(g',y)}\) is confusing, the negative term dominates; other appearances contribute at most \(O(\varpi n_Y/n_G) \)or \(O(\mu^{q-1})\). Similar arguments hold for \(y'\).
        \item This one is proven similarly to (a) but can adapt to any level of \(V_{j,r}(\phi)\) for the corresponding \(\phi \in \varphi\) for \(v \in \varphi^1\cup\varphi^2\). From \cref{induction:phase-II-symmetry}c we know that all other feature \(V_{j,r}(v') \leq \tO(\mu)\) for \(v' \notin \varphi^1\cup\varphi^2\). So for any \(v \in \varphi^1\cup\varphi^2\) such that \(V_{j,r}(v) \leq \tO(\mu)\), we have that the combination \(\phi = (v, v') \in \Phi \setminus \varphi\) has \(V_{j,r}(\phi) < O(\mu)\) and thus have gradient
        \begin{align*}
            \Gamma_{j,r,v}^{(t)} = \Gamma_{j,r,v}^{+,(t)} - \Gamma_{j,r,v}^{-,(t)} \geq \Gamma_{j,r,v}^{+,(t)} - O(\mu^{q-1})
        \end{align*}
        Note that \(\Gamma_{j,r,v}^{+,(t)}\) is either \(\geq \lambda /d\) or \(0\) at this phase. So we can sum up the negative gradient \(-O(\mu^{q-1})\) for all iterations \(t \in [T_{\psi}^1, T_{\psi}^2]\) (which is fewer than \(\tO(\frac{1}{\eta (d^{c_1}\mu)^{q-2}})\)) to get that the feature \(V_{j,r}(v) \gg \mu\). This means \(\Lambda_{5,j,r}\geq 0\) when \(\cH_\phi\) happens and thus \(\Gamma_{j,r,v}^{+,(t)}\geq 0\) for all \(t \in [T_{\psi}^1,T_{\psi}^2]\).
    \end{enumerate}
\end{proof}

\begin{lemma}[feature magnitude]\label{lem:feature-magnitude-phase-2-symmetry}
    Assume \cref{induction:symmetric-group-actions,induction:phase-II-symmetry} holds. For any feature \(\psi=(j,r,\varphi) \in \Sigma^\star\), we have
    \begin{enumerate}[(a)]
        \item when \(\cC^{+}_{\psi}(\delta)\) holds for some \(\delta \in [n_G\varpi, O(\lambda/d^{c_1})]\) at \(t \ge T^{2,1}_{\psi}\), we have \(V_{j,r}^{(t)}(\phi) \ge B - O(d^{c_1}\mu), \forall \phi \in \varphi\).
        \item when \(\cC^{-}_{\psi}(\delta)\) holds for some \(\delta \in [n_G\varpi,  O(\lambda/d^{1 + c_1})]\) at \(t \geq T^{2,1}_{\psi}\), we have \(V_{j,r}^{(t)}(\phi) \in [-B-O(\mu), \tO((\frac{d\delta}{\lambda})^{\frac{1}{q-1}})]\) for all \(\phi \in \Phi^\dagger_\varphi\).
    \end{enumerate}
\end{lemma}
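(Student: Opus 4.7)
The plan is to prove this symmetry-group analogue of Lemma~B.14 (the simply-transitive feature magnitude lemma) by the same counter-argument template, but I will have to be more careful because each $\varphi$ now bundles $n_G/n_Y$ base pairs $\phi=(g,y)$ rather than a single one. The pre-activation decomposition in \Cref{clm-lambda}-style still reads $\Lambda^{(t)}_{5,j,r}(\Zb)=V^{(t)}_{j,r}(\phi)+\tfrac{5}{2}\mu\pm \tO(\sigma_0)$ when $\Zb\in\cH_\phi$ (using \Cref{assump:no-learning-x} and \cref{induction:symmetric-group-actions}(c) to control the unused weight slots and previously-learned features), so the entire argument reduces to inverting the relation between a specific $V^{(t)}_{j,r}(\phi)$ and the corresponding summand of $\Gamma^{\pm,(t)}_\psi$.

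For part (a) I will argue by contradiction: suppose some $\phi=(g,y)\in\varphi$ has $V^{(t)}_{j,r}(\phi)\le B-C\,d^{c_1}\mu$ for a sufficiently large constant $C$. Then on the event $\cH_\phi$ the activation $\Lambda^{(t)}_{5,j,r}$ sits strictly inside the linear regime $(\varrho,B)$, so $\ReLU'(\Lambda^{(t)}_{5,j,r})=1$; combining with $1-\logit^{(t)}_{5,j}\ge\lambda$ (\Cref{fact:logit-lower-bound}) and $\Pr(\cH_\phi)=1/(n_G n_Y)$ gives
\[
\Gamma^{+,(t)}_\psi \;\ge\; \E[(1-\logit^{(t)}_{5,j})\ReLU'(\Lambda^{(t)}_{5,j,r})\1_{\cH_\phi}] \;\ge\; \tfrac{\lambda}{n_G n_Y}.
\]
Since $n_G n_Y=\polylog(d)\ll d^{c_1}$ by \Cref{assump:structure-2-restated}, this strictly exceeds $\delta\le O(\lambda/d^{c_1})$, contradicting $\cC^{+}_\psi(\delta)$. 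Applying this to every $\phi\in\varphi$ yields the claim.

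For part (b) I split the two-sided bound. For the upper bound, suppose some $\phi'=(g',y')\in\Phi^\dagger_\varphi$ had $V^{(t)}_{j,r}(\phi')>\Lambda^\star:=C\,\varrho\,(d\delta n_G n_Y/\lambda)^{1/(q-1)}$. Conditioned on $\cH_{\phi'}\subseteq\cH^\dagger_\varphi$, $\Lambda^{(t)}_{5,j,r}$ is at least $\Lambda^\star+O(\mu)$ and $\logit^{(t)}_{5,j}\ge\lambda/d$ by \Cref{fact:logit-lower-bound}; using the smooth-regime expression $\ReLU'(x)=x^{q-1}/\varrho^{q-1}$ (or its linear extension when $\Lambda^\star\ge\varrho$, which only strengthens the bound),
\[
\Gamma^{-,(t)}_\psi \;\ge\; \tfrac{\lambda}{d}\cdot\Pr(\cH_{\phi'})\cdot\tfrac{(\Lambda^\star)^{q-1}}{\varrho^{q-1}} \;\ge\; \tfrac{\lambda}{d\,n_G n_Y}\cdot C^{q-1}\tfrac{d\delta\,n_G n_Y}{\lambda} \;=\; C^{q-1}\delta \;>\;\delta,
\]
contradicting $\cC^{-}_\psi(\delta)$. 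Rearranging with $\varrho,n_Gn_Y=\polylog(d)$ absorbs everything into $\tO(\cdot)$ and yields $V^{(t)}_{j,r}(\phi')\le\tO((d\delta/\lambda)^{1/(q-1)})$. For the lower bound $V^{(t)}_{j,r}(\phi')\ge -B-O(\mu)$, I will use that the modified sReLU (\Cref{assump:modify-relu}) has vanishing derivative for $x\le -B$, so once $\Lambda^{(t)}_{5,j,r}\le -B$ on event $\cH_{\phi'}$ the gradient pushing $V$ further down vanishes; moreover on $x\in(-B,-\varrho]$ the derivative equals $-\varpi$, so the ``negative'' gradient term in \Cref{fact:grad-expression-symmetry} flips sign and pushes $V$ \emph{upward} with magnitude $\Theta(\lambda\varpi/(d\,n_Gn_Y))$, which ensures the feature cannot drift more than $\eta\cdot\|\nabla\|_\infty=O(\eta)$ below $-B$; using the constraint $\delta\ge n_G\varpi$ and the assumed initialization gives the stated $O(\mu)$ slack.

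The main obstacle I anticipate is not either contradiction per se but making sure the per-combination deduction is valid for \emph{all} $\phi\in\varphi$ (respectively $\phi'\in\Phi^\dagger_\varphi$) rather than just the averaged feature $V_\psi$: the condition $\cC^\pm_\psi(\delta)$ controls only the aggregate, so I must verify that each individual summand in $\Gamma^{\pm,(t)}_\psi$ is nonnegative (so that one large summand cannot be hidden by cancellation with others). This follows from \Cref{lem:grad-estimate-phase-2-symmetry}(d), which shows $V^{(t)}_{j,r}(v)\ge\Omega(\mu)$ for every $v\in\varphi^1\cup\varphi^2$ in Phase II and hence every $\Gamma^{+,(t)}_{j,r,\phi}$ with $\phi\in\varphi$ is nonnegative; likewise $\Gamma^{-,(t)}_{j,r,\phi'}$ for $\phi'\in\Phi^\dagger_\varphi$ is nonnegative as long as $V^{(t)}_{j,r}(\phi')>-B$, which is itself the claim we are proving (handled by a straightforward bootstrapping over iterations using \cref{induction:phase-II-symmetry}).
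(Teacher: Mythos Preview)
Your approach matches the paper's---both invert the gradient condition through the logit bounds of \Cref{fact:logit-lower-bound}---and the overall structure is sound, but two of your supporting claims are incorrect as stated.

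First, the containment $\cH_{\phi'}\subseteq\cH^\dagger_\varphi$ fails for half of the base pairs in $\Phi^\dagger_\varphi$. Recall that $\cH^\dagger_\varphi$ is the event that exactly one component of $(g_1,y_0)$ lies in $(\fiber_{j,y},\{y\})$. If $\phi'=(g',y')\in\varphi_{j,y'}\subset\Phi^\dagger_\varphi$ with $y'\neq y$, then $g'\in\fiber_{j,y'}$ and hence $g'\notin\fiber_{j,y}$ (permutations are bijective), so $(g',y')$ shares \emph{neither} component with $\varphi$, and $\cH_{\phi'}\cap\cH^\dagger_\varphi=\varnothing$. Your contradiction therefore does not fire for these $\phi'$. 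The patch is immediate: for such $\phi'$ both $g'\notin\varphi^1$ and $y'\notin\varphi^2$, so \cref{induction:phase-II-symmetry} directly gives $V_{j,r}(g'),V_{j,r}(y')\le\tO(\mu)$, whence $V_{j,r}(\phi')\le\tO(\mu)\le\tO((d\delta/\lambda)^{1/(q-1)})$ for $\delta\ge n_G\varpi$. Only the case $\phi'\in\varphi_{j',y}$ (where indeed $\cH_{\phi'}\subseteq\cH^\dagger_\varphi(y)$) actually requires inverting $\cC^-_\psi(\delta)$.

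Second, your nonnegativity claim for the per-pair contributions to $\Gamma^{-,(t)}_\psi$ is wrong: by \Cref{assump:modify-relu}, $\ReLU'(x)=-\varpi<0$ on $(-B,-\varpi]$, so atomic events with $V_{j,r}(\phi'')$ in that range contribute negatively. The correct argument is that these negative terms are tiny: each is at most $\varpi\cdot\Pr(\cH_{\phi''})=\varpi/(n_Gn_Y)$, and there are $O(n_G)$ atomic events in $\cH^\dagger_\varphi$, giving total $O(\varpi/n_Y)\ll n_G\varpi\le\delta$. Thus one summand exceeding $\delta+O(\varpi)$ still yields the contradiction. Your lower bound $V_{j,r}(\phi')\ge -B-O(\mu)$ also follows more directly from \cref{induction:phase-II-symmetry} (which already bounds each $V_{j,r}(g'),V_{j,r}(y')$) than from the sReLU barrier, though the latter is what ultimately sustains that induction.
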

\begin{proof}
    \begin{enumerate}[(a)]
        \item If \(\cC^{+}_{\psi}(\delta)\) holds with \(\delta\le O(\lambda/d^{c_1})\), then there are two probability: \(V_{j,r}^{(t)}(\phi)\) for all \(\phi \in \varphi\) is as small as \(O(\delta^{\frac{1}{q-1}})\) which is not possible for \(\delta\) so small because of \cref{lem:grad-estimate-phase-2-symmetry}a; or that \(V_{j,r}^{(t)}(\phi) \ge B - O(d^{c_1}\mu)\) such that \(F_{5,j}\geq B\) and thus the gradient has vanished for some events.
        \item The logic from (a) can be applied here as well. Since the positive gradient is at most \(O(\varpi)\) by \cref{lem:grad-estimate-phase-2-symmetry}b, thus whenever \(\delta \gg \varpi\), the negative gradient conditioned on \(\cH_\phi, \phi \in \Phi^\dagger_\varphi\) is bounded, which gives
        \begin{align*}
            \sum_{\phi \in \varphi}\E[\logit_{5,j}\ReLU'(\Lambda_{5,j,r})\1_{\cH_\phi}] \leq O(\delta) & \implies (\Lambda_{5,j,r})^{q-1} \1_{\cH_\phi}\leq \delta\lambda /(d\Pr(\cH_\phi)) \tag{because \cref{fact:logit-lower-bound}} \\
            & \implies V_{j,r}^{(t)}(\phi) \lesssim (\frac{n_Gn_Y\lambda \delta}{d})^{\frac{1}{q-1}}, \forall \phi \in \varphi
        \end{align*}
        which proves the desired result.
    \end{enumerate}
\end{proof}

\begin{lemma}[gradient stationarity]\label{lem:grad-stationarity-symmetry}
    Let \(\psi=(j,r,\varphi)\in\Sigma^\star\), then for any \(\delta > \varpi^{q-2}\), the followings hold:
    \begin{itemize}
        \item[(a)] Define \(\cB^{+}_{\delta} :=\{t \in [T^{2,1}_{\psi}, T^{2,3}_{\psi}] \mid \cC^{+}_{\psi}(\delta) \text{ doesn't hold}\}\) then \( |\cB^{+}_{\delta}|\le O(\frac{\log^2 d}{\eta\delta})\);
        \item[(b)] Define \(\cB^{+}_{\delta} :=\{t \in [T^{2,1}_{\psi}, T^{2,3}_{\psi}] \mid \cC^{-}_{\psi}(\delta) \text{ doesn't hold}\} \) then \( |B^-_\delta| \le O(\frac{\log^3 d}{\eta\delta})\);
        \item[(c)] For some \(t = T_{\psi}^{2,3} \geq T^{2,1}_{\psi} + O(\frac{1}{\eta n_G \varpi})\), we have  \(V_{j,r}^{(t)}(\phi) \geq B - O(d^{c_1}\mu)\) for all \(\phi \in \varphi\) and \(|V_{j,r}^{(t)}(\phi')| \le \tO((\frac{d\varpi}{\lambda})^{\frac{1}{q-1}})\) for all \(\phi' \in \Phi^\dagger_\varphi\), moreover, this will hold until \(t = T_{\psi}^2\)
    \end{itemize}
\end{lemma}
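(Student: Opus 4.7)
My plan is to follow the blueprint of the simply transitive case (\Cref{lem:grad-stationarity-simply}) while carrying the fiber-averaged features throughout. The central object is the fiber-indexed potential
\[
\Upsilon^{(t)} \;:=\; V_{\psi}^{(t)} \;-\; \sum_{\varphi' \in \Phi^\star_j \setminus \{\varphi\}} V_{(j,r,\varphi')}^{(t)},
\]
where each term is itself an average of $V_{j,r}(\phi)$ over base pairs $\phi \in \varphi'$. Using \Cref{fact:grad-expression-symmetry} and the expansion in \Cref{def:gamma-notation-symmetry}, the one-step increment of $\Upsilon^{(t)}$ decomposes as $\eta\,\Gamma^{+,(t)}_{\psi}$ plus a sum of cross-terms that I will show are $O(n_G\varpi)$ per step. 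The key identity, analogous to \ding{173} in the simply transitive proof, is that the negative gradients $\Gamma^{-,(t)}_{(j,r,\varphi')}$ across $\varphi' \in \Phi^\star_j$ reassemble into $\Gamma^{-,(t)}_{\psi}$ up to confusion terms bounded by $\varpi$, because the events $\cH^\dagger_{\varphi'}(g), \cH^\dagger_{\varphi'}(y)$ with $\varphi' \neq \varphi$ but $\varphi' \in \Phi^\star_j$ overlap only via shared rows or columns of the fiber grid, and the activations on non-matching pairs are either positive with magnitude $\tO(\mu^{q-1})$ or negative with magnitude $\le \varpi$.

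For (a) I argue by contradiction. Assume $|\cB^+_\delta| > C\log^2 d/(\eta \delta)$ for a large constant $C$, and let $T$ be the largest such iteration $\le T^{2,3}_\psi$. Summing the update of $\Upsilon^{(t)}$ from $T^{2,1}_\psi$ to $T$: whenever $\cC^+_\psi(\delta)$ fails, the positive contribution is $\ge \eta\delta$; elsewhere it is nonnegative by \Cref{lem:grad-estimate-phase-2-symmetry}(d); the cross-term error accumulates to at most $O((T - T^{2,1}_\psi)\eta \, n_G\varpi) \le o(\log^2 d)$ provided $\delta \gg \varpi^{q-2}$ as assumed. Therefore $\Upsilon^{(T)} \ge \Omega(\log^2 d)$, which forces some $V_{j,r}^{(t)}(\phi)$ to exceed $\Omega(\log^{3/2} d)$, contradicting the standing magnitude bound $\tO(\log d)$ maintained by \Cref{induction:symmetric-group-actions}.

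For (b) I leverage (a). Applying (a) with $\delta = n_G\varpi$ (still within the admissible range) yields a step $t_0 \in [T^{2,1}_\psi, T^{2,1}_\psi + \tO(1/(\eta n_G\varpi))]$ at which $\cC^+_\psi(n_G\varpi)$ holds, and \Cref{lem:feature-magnitude-phase-2-symmetry}(a) then gives $V_{j,r}^{(t_0)}(\phi) \ge B - O(d^{c_1}\mu)$ for all $\phi \in \varphi$. From $t_0$ onward, the positive gradient $\Gamma^+_\psi$ is bounded by $O(\varpi/n_Y)$ via the clipping regime, so any violation of $\cC^-_\psi(\delta)$ must come from the negative side. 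A second counter-argument on the sum $\sum_s \eta\,\Gamma^{-,(s)}_\psi$: if $|\cB^-_\delta| > C' \log^3 d/(\eta\delta)$, the individual features $V_{j,r}^{(t)}(\phi)$ drop below $-\Omega(\log^3 d)$, again contradicting the magnitude induction. Combined with \Cref{lem:feature-magnitude-phase-2-symmetry}(b), this gives the $\tO((d\delta/\lambda)^{1/(q-1)})$ bound for every $\phi' \in \Phi^\dagger_\varphi$.

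For (c) I instantiate (a) and (b) simultaneously at $\delta = n_G\varpi$ to locate $T^{2,3}_\psi$, which lies in $[T^{2,1}_\psi + \tO(1/(\eta n_G\varpi)), T^{2,1}_\psi + O(1/(\eta n_G\varpi))]$ after absorbing logs. The final piece is persistence on $[T^{2,3}_\psi, T^2_\psi]$: I argue by a two-regime invariant as in the simply transitive proof. If $V_{j,r}(\phi')$ for $\phi' \in \Phi^\dagger_\varphi$ drifts up, it enters \emph{Regime A} ($V_{j,r}(\phi') < -B - 2\mu$) where the negative gradient shuts off and the dynamics pull $V_{j,r}(g',y)$ or $V_{j,r}(g,y')$ back toward zero; if it stays in \emph{Regime B}, the gradient $\Gamma^-$ pressure exceeds the admissible $\delta$ and reverts the drift. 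The main obstacle I anticipate is precisely in this persistence step: the fiber has $n_G/n_Y$ elements that are coupled by the average, and the confounding combinations from $\Sigma^{\dagger,1}$ now interact through multiple base pairs at once, so the counter-argument must be done jointly over $\fiber_{j,y}$ rather than pair by pair. To close it, I will maintain the symmetry relation $|V_{j,r}(g) - C_\alpha V_{j,r}(y)| \le \tO((d\varpi/\lambda)^{1/(q-1)})$ via a proof analogous to \Cref{lem:symmetry-of-features-simply}, in which the asymmetric coefficient $C_\alpha = (1 + n_G(n_Y{-}1)/n_Y)/((n_Y{-}1) + n_G/n_Y)$ arises from summing updates within a fiber of size $n_G/n_Y$ versus across the $n_Y$ distinct values.
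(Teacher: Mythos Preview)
Your overall strategy matches the paper's — define a potential $\Upsilon$ whose update is dominated by $\Gamma^{+}_\psi$, then argue by contradiction against its a priori bound — but your choice of potential creates a gap that the paper's choice avoids.

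You take $\Upsilon = V_\psi - \sum_{\varphi'\neq\varphi}V_{(j,r,\varphi')}$, the direct fiber-averaged analogue of the simply transitive potential. The paper instead uses $\Upsilon_\psi = V_{j,r}(y) - \sum_{g\notin\fiber_{j,y}}V_{j,r}(g)$. The difference matters precisely at your reassembly step in (a). With the paper's potential, the negative-gradient portion of the update is $-\Gamma^{-}_{j,r,y} + \sum_{g\notin\fiber_{j,y}}\Gamma^{-}_{j,r,g}$; a direct event-by-event count shows that every confusing event with \emph{large} activation --- namely $(g_1,y_0)$ with $g_1\in\fiber_{j,y},\,y_0\neq y$ or with $g_1\notin\fiber_{j,y},\,y_0=y$ --- appears with net coefficient zero, leaving only events where both coordinates are wrong, on which the activation is $\le\tO(\mu)$ and $|\ReLU'|\le\varpi$. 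This yields the clean one-sided bound $\Upsilon^{(t+1)}_\psi-\Upsilon^{(t)}_\psi \ge \tfrac{\eta}{2}\Gamma^{+}_\psi - O(\eta\varpi)$ that drives the contradiction.

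With your potential, the same bookkeeping gives coefficients $\pm(1-n_Y/n_G)$ on the two large-activation families, and they do \emph{not} cancel: the $(g\in\fiber_{j,y},\,y''\neq y)$ events carry activation $\approx\tfrac12 V_{j,r}(g)$ while the $(g\notin\fiber_{j,y},\,y)$ events carry $\approx\tfrac12 V_{j,r}(y)$, and in the symmetry-group setting these magnitudes are genuinely different --- eventually $V_{j,r}(g)\approx C_\alpha V_{j,r}(y)$ with $C_\alpha=\Theta(n_Y)$, and the ratio evolves nontrivially through Phase~II. So your assertion that the negative gradients ``reassemble into $\Gamma^{-}_\psi$ up to $O(\varpi)$'' does not follow from the \ding{173}-style overlap argument; the residual is $\Theta(1)$ in general, not $O(\varpi)$ per step. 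This breaks the contradiction for (a), and (b)--(c) inherit the issue. The fix is exactly the paper's: work with the asymmetric potential $V_{j,r}(y)-\sum_{g\notin\fiber_{j,y}}V_{j,r}(g)$, which is tailored to the fact that $V_{j,r}(y)$ alone receives the full fiber-summed positive gradient $\Gamma^{+}_\psi$. Your sketches for (b) and (c), which invoke \Cref{lem:feature-magnitude-phase-2-symmetry} and the two-regime persistence argument, are otherwise aligned with the paper once (a) is secured with the correct potential.
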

\begin{proof}
    The proof is very similar in spirit to the proof of \Cref{lem:grad-stationarity-simply}. We first define the following quantity:
    \begin{align*}
        \Upsilon_\psi^{(t)} = V_{j,r}^{(t)} (y) - \sum_{g\notin \varphi^1}  V_{j,r}^{(t)} (g)
    \end{align*}
    One can compute that the update of \(\Upsilon_\psi^{(t)} \) is simply 
    \begin{align*}
        \eta \bigg(\Gamma_{\psi}^{+,(t)} - \sum_{\varphi'\neq \varphi \in \Phi^\star_j}\sum_{\phi\in\varphi'}\Gamma_{j,r,\phi}^{+,(t)} + \sum_{\phi'\notin \varphi\cup\Phi_\varphi^\dagger } \Gamma_{j,r,\phi'}^{-,(t)} \bigg)
    \end{align*}
    The middle term is bounded by \(O(\varpi)\) and the last term is bounded below by \(-O(\varpi)\), thus we have that the update satisfy 
    \begin{align*}
        \Upsilon_\psi^{(t+1)} - \Upsilon_\psi^{(t)} \geq \eta \Gamma_{\psi}^{+,(t)} - O(\eta\varpi)
    \end{align*}
    Since \(\Gamma_{\psi}^{+,(t)} \geq 0\) For any \(\delta \gg \varpi\) by \cref{lem:grad-estimate-phase-2-symmetry}c, we know that the above bound accumulate at \(t \geq T_{\psi}^{2,1}\) at 
    \begin{align*}
        \Upsilon_\psi^{(t)} - \Upsilon_\psi^{(T_{\psi}^{2,1})} \geq \sum_{s\in[T_{\psi}^{2,1},t]\cup B^+_\delta} \eta \delta - O(\eta\varpi)
    \end{align*}
    So for no more than \(O(\frac{\log^2 d}{\eta \delta})\) iterations we should have the that \(\Upsilon^{(t)} \geq \Omega(\log^2 d)\), which is impossible in our setting. This contradiction proves the result for (a). The proof of (b) is similar to (a) and the proof of \cref{lem:grad-stationarity-simply}b. (c) is also similar to \cref{lem:grad-stationarity-simply}c by using \cref{lem:feature-magnitude-phase-2-symmetry}a-b with parameter \(\lambda/ d^{c_1}\) and parameter \(n_G\varpi\), alonge with \cref{lem:symmetry-of-features-symmetry}. We do not repeat the details here.
\end{proof}

\begin{lemma}[Symmetry of features]\label{lem:symmetry-of-features-symmetry}
    Let \(\psi = (j,r,\varphi) \in \Sigma^\star\) and \(t \geq T_{\psi}^{2,3}\), we have that if \(\cF_{\psi,2}(\delta)\) hold for some \(\delta > 0\), then
    \begin{align*}
        |V_{j,r}^{(t)}(g) - C_\alpha V_{j,r}^{(t)}(y)| \leq O( \delta)
    \end{align*}
    For some \(C_\alpha = \frac{1 + n_G(n_Y - 1)/n_Y}{(n_Y-1) + n_G/n_Y}\).
\end{lemma}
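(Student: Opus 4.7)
The approach parallels the simply-transitive case (Lemma~\ref{lem:symmetry-of-features-simply}), but must account for the fact that each fiber \(\fiber_{j,y}\) now has size \(n_G/n_Y\) rather than one. I would organize the argument into four steps.

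\emph{Step 1: Conservation law.} I first derive the identity
\[
\sum_{g \in \cG} V_{j,r}^{(t)}(g) \;-\; \sum_{y \in \cY} V_{j,r}^{(t)}(y) \;=\; O(\mu)
\]
by summing the gradient-update formulas from Fact~\ref{fact:grad-expression-symmetry}. Every positive contribution \(\Gamma_{j,r,(g,y_g)}^{+,(s)}\) is counted once on each side (once in \(\sum_g\) as an update to \(V_{j,r}(g)\) and once in \(\sum_y\) as part of \(\Gamma_\psi^{+,(s)}\) for \(\varphi_{j,y_g}\)), and the same pairing holds for the negative gradient terms \(\E[\logit_{5,j}\ReLU'(\Lambda)\mathbf{1}_{\{g_1=g,y_0=y'\}}]\) with \(g\notin\fiber_{j,y'}\). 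After cancellation only the \(O(\mu)\) initialization gap (Lemma~\ref{lem:init-range-feature-symmetry}) remains.

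\emph{Step 2: Unpacking the hypothesis.} Splitting \(\Sigma_{\psi}^{\dagger,1}\) via Def.~\ref{def:learning-curriculum-symmetry}, every \(\psi'=(j,r,\varphi')\) falls into (a) \(\varphi'=\varphi_{j,y'}\) with \(y'\neq y\), or (b) \(\varphi'=\varphi_{j',y}\) with \(j'\neq j\). Expanding \(V_{\psi'}=\tfrac12\bigl(V_{j,r}(y')+\tfrac{n_Y}{n_G}\sum_{g'\in\fiber_{j,y'}}V_{j,r}(g')\bigr)\) in case (a) and analogously in case (b), the hypothesis \(|V_{\psi'}|\leq\delta\) yields one averaged linear constraint per \(y'\neq y\) and per \(j'\neq j\).

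\emph{Step 3: Solving the linear system.} Summing the case-(b) constraints over \(j'\neq j\) and using the partition \(\cG=\bigsqcup_{j'}\fiber_{j',y}\) gives a relation between \(S_g:=\sum_{g'}V_{j,r}(g')\), the intra-fiber sum \(T:=\sum_{g\in\fiber_{j,y}}V_{j,r}(g)\), and \(V_{j,r}(y)\). Summing case (a) over \(y'\neq y\) yields an analogous relation involving \(S_y:=\sum_{y'}V_{j,r}(y')\), \(T\), and \(V_{j,r}(y)\). Combining these two linear equations with the conservation law \(S_g-S_y=O(\mu)\) from Step~1 closes the system. Inverting the \(2\times 2\) coefficient matrix yields
\[
\overline{V}_g \;:=\; \tfrac{n_Y}{n_G}\,T \;=\; C_\alpha\, V_{j,r}(y) \;+\; O(\delta),\qquad C_\alpha \;=\; \frac{1+n_G(n_Y-1)/n_Y}{(n_Y-1)+n_G/n_Y},
\]
which matches the claimed ratio up to rearrangement.

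\emph{Step 4: From averaged to pointwise.} The remaining task is to upgrade the averaged identity to the per-\(g\) bound \(|V_{j,r}(g)-C_\alpha V_{j,r}(y)|\leq O(\delta)\) for every \(g\in\fiber_{j,y}\). I would do this by controlling the intra-fiber spread \(\max_{g,g'\in\fiber_{j,y}}|V_{j,r}(g)-V_{j,r}(g')|\). The update formulas for two elements of the same fiber are exchangeable under the relabeling of the uniform \(y_0\)-distribution (in the negative-gradient term) and match up to \(O(\delta)\) in the positive term by Step~3; hence the difference obeys a contraction dynamics plus \(O(\delta)\) noise, which keeps the spread within the sum of the initial \(O(\sigma_0\sqrt{\log d})\) gap (Lemma~\ref{lem:init-range-feature-symmetry}) and an accumulated in-training perturbation absorbed into \(O(\delta)\).

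\paragraph{Main obstacle.} The principal difficulty is Step~4: promoting the averaged equality to the pointwise statement. This step is vacuous in the simply-transitive case because every fiber is a singleton, but here one must argue that the gradient dynamics preserves approximate within-fiber exchangeability despite the nonlinearity of \(\mathbf{sReLU}\) and the coupling through \(\logit_{5,j}\). I expect this to require a careful invocation of Inductions~\ref{induction:symmetric-group-actions}--\ref{induction:phase-II-symmetry} together with the stationarity windows from Lemma~\ref{lem:grad-stationarity-symmetry}, so that the intra-fiber spread cannot drift beyond \(O(\delta)\) over the \(\widetilde O(1/\eta)\)-many iterations between \(T_\psi^{2,3}\) and \(T_1\).
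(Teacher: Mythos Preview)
Your Steps 1 and 3 follow the paper's strategy (conservation law, then linear substitution into it), mirroring the simply-transitive Lemma~\ref{lem:symmetry-of-features-simply}. The gap is in Step~2, and it manufactures the ``principal difficulty'' of Step~4. By Definition~\ref{def:feature-shape-symmetry}, $\cF_{\psi,2}(\delta)$ is \emph{pointwise} on base pairs: for every $\phi'=(g',y')\in\varphi'$ one has $|V_{j,r}(g')+V_{j,r}(y')|\le O(\delta)$, not merely the averaged bound $|V_{\psi'}|\le\delta$ you write. The paper records these pointwise constraints as the two families
\[
|V_{j,r}(g)+V_{j,r}(y')|=O(\delta)\ \ \forall\, y'\neq y,\ g\in\fiber_{j,y},\qquad
|V_{j,r}(y)+V_{j,r}(g')|=O(\delta)\ \ \forall\, g'\notin\fiber_{j,y},
\]
with $g$ ranging over the \emph{correct} fiber $\fiber_{j,y}$ in the first family. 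That first family already pins down the intra-fiber spread: fix any $y'\neq y$; then every $g\in\fiber_{j,y}$ satisfies $V_{j,r}(g)=-V_{j,r}(y')+O(\delta)$, so any two such $g$ differ by $O(\delta)$. Step~4 therefore dissolves---no contraction dynamics, no stationarity windows---and the paper simply substitutes both families into the conservation identity, replaces $\sum_{g\in\fiber_{j,y}}V_{j,r}(g)$ by $\tfrac{n_G}{n_Y}\,V_{j,r}(g)$, and reads off $C_\alpha$ in one line.

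A secondary issue: your case-(a) constraints are for $g'\in\fiber_{j,y'}$, not $g\in\fiber_{j,y}$. If you solve the linear system with those constraints instead of the paper's first family, you do not obtain the stated $C_\alpha$; the coefficient multiplying $\sum_{g\in\fiber_{j,y}}V_{j,r}(g)$ comes out as $1$ rather than $1+n_Y(n_Y{-}1)/n_G$, and the resulting ratio is $(n_Y-1)+n_Y^2/n_G$, not $\frac{1+n_G(n_Y-1)/n_Y}{(n_Y-1)+n_G/n_Y}$. So your claim in Step~3 that the constant ``matches the claimed ratio'' would not go through with your constraints---you need the paper's pointwise inequalities on the correct-fiber pairs $(g,y')$ with $g\in\fiber_{j,y}$.
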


\begin{proof}
    For a feature \(\psi = (j,r, \varphi) \in \Sigma^\star\), their update can be represented as:
    \begin{align*}
        V_{j,r}^{(t)}(g) -  V_{j,r}^{(0)}(g) & = U_{g,y} - \sum_{y'\neq y} R_{g,y'}\\
        V_{j,r}^{(t)}(y) - V_{j,r}^{(0)}(y) & = \sum_{g\in \fiber_{j,y}} U_{g,y} - \sum_{g'\notin \fiber_{j,y}} R_{g',y} 
    \end{align*}
    where \(U_{g,y} = \sum_{s\leq t} \eta \Gamma_{j,r,\phi}^{+,(s)}\) and \(R_{g,y'}\) is
    \begin{align*}
        R_{g,y'} = \sum_{s\leq t}\eta\E[\logit_{5,j}^{(s)} \ReLU'(\Lambda_{5,j,r}^{(s)} )\1_{\cH_{g',y}}]
    \end{align*}
    Therefore:
    \begin{align*}
        \sum_{g\in \fiber_{j,y}}  V_{j,r}^{(t)}(g) = \sum_{g\in \fiber_{j,y}}U_{g,y} - \sum_{g\in \fiber_{j,y}}\sum_{y'\neq y} R_{g,y'}
    \end{align*}
    Thus it is true for any \(\fiber_{j,y}\) and \(y \in \cY\) that the following holds
    \begin{align*}
        V_{j,r}^{(t)}(y) - V_{j,r}^{(0)}(y) + \sum_{g'\notin \fiber_{j,y}} R_{g',y}  =  \sum_{g\in \fiber_{j,y}} (V_{j,r}^{(t)}(g) -  V_{j,r}^{(0)}(g)) + \sum_{g\in \fiber_{j,y}}\sum_{y'\neq y} R_{g,y'}
    \end{align*}
    By summing both the left and right hand side with all the \((\fiber_{j,y}, y)\) pairs, we have 
    \begin{align*}
        \sum_{y\in\cY}( V_{j,r}^{(t)}(y) - V_{j,r}^{(0)}(y)) + \sum_{y\in\cY}\sum_{g'\notin \fiber_{j,y}} R_{g',y} = \sum_{g\in \cG} (V_{j,r}^{(t)}(g) -  V_{j,r}^{(0)}(g)) + \sum_{g\in \cG}\sum_{y': \tau(g\cdot y') \neq j} R_{g,y'}
    \end{align*}
    Since \(\sum_{y\in\cY}\sum_{g'\notin \fiber_{j,y}} R_{g',y} = \sum_{g\in \cG}\sum_{y': \tau(g\cdot y') \neq j} R_{g,y'}\), we have 
    \begin{align*}
        &\sum_{y\in\cY}( V_{j,r}^{(t)}(y) - V_{j,r}^{(0)}(y)) = \sum_{g\in \cG} (V_{j,r}^{(t)}(g) -  V_{j,r}^{(0)}(g)) \\
        \implies \ &  \sum_{y\in\cY} V_{j,r}^{(t)}(y) = \sum_{g\in \cG} V_{j,r}^{(t)}(g) \pm O(\mu)
    \end{align*}
    Moreover, we have assumed
    \begin{align*}
        &|V_{j,r}^{(t)}(g) + V_{j,r}^{(t)}(y')| = O(\delta),\quad  \forall y'\neq y, g\in\fiber_{j,y}, \\ 
        &|V_{j,r}^{(t)}(y) + V_{j,r}^{(t)}(g')| = O(\delta),\quad \forall g'\notin \fiber_{j,y} 
    \end{align*}
    Thus 
    \begin{align*}
        (1 + \frac{|\cG|}{|\cY|}(|\cY| - 1))  V_{j,r}^{(t)}(y) = (\frac{|\cY|(|\cY| - 1)}{|\cG|} + 1)\sum_{g\in\fiber_{j,y}}V_{j,r}^{(t)}(g) \pm O(n_G \delta)
    \end{align*}
    Therefore choosing whatever \(g \in \fiber_{j,t}\), we have 
    \begin{align*}
        (1 + \frac{|\cG|}{|\cY|}(|\cY| - 1))  V_{j,r}^{(t)}(y) &= (\frac{|\cY|^2}{|\cG|} + 1)\frac{|\cG|}{|\cY|}V_{j,r}^{(t)}(g) \pm O(n_G \delta_2) = (|\cY| - 1 + \frac{|\cG|}{|\cY|})V_{j,r}^{(t)}(g) \pm O(n_G \delta)
    \end{align*}
    Now by dividing the right-hand side by the factor on \(V_{j,r}^{(t)}(y)\) on the LHS, we have the desired result.
\end{proof}

\subsubsection{Proof of Induction}

\begin{lemma}[Arrival times]\label{lem:arrival-time-estimate-symmetry}
    Assuming \cref{induction:symmetric-group-actions,induction:phase-II-symmetry} for \(\psi=(j,r,\varphi) \in \Sigma^\star\), we have
    \begin{enumerate}[(a)]
        \item \(T^{2,1}_{\psi} - T^1_{\psi} \le \tO(\frac{1}{\eta(d^{c_1}\sigma_0)^{q-2}})\);
        \item \(T^{2,2}_{\psi} - T^{2,1}_{\psi} \le O(\frac{d^{\frac{1}{2}+c_1}}{\eta})\);
        \item \(T^{2,3}_{\psi} \le \tO(\frac{1}{\eta\varpi})\);
        \item \(T^{2}_{\psi} \le \tO(\frac{1}{\eta(d^{2c_1}\mu)^{q-2}})\).
    \end{enumerate}
\end{lemma}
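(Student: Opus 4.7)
The plan is to prove each of the four arrival-time bounds separately, invoking the technical machinery already developed (the tensor power method of Lemma~\ref{lem:TPM} / Corollary~\ref{coro:TPM}, the gradient estimates of Lemma~\ref{lem:grad-estimate-phase-2-symmetry}, and the gradient stationarity of Lemma~\ref{lem:grad-stationarity-symmetry}). The four claims correspond cleanly to the four subphases defined in Definition~\ref{def:phase-decomposition-symmetry}, so each is obtained by bounding the growth or the gradient norm in the corresponding subphase.

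For part (a), after $T^1_\psi$ we have $\underline{V}_\psi^{(t)} \geq d^{c_1}\mu$, and Lemma~\ref{lem:grad-estimate-phase-2-symmetry}(a) together with Induction~\ref{induction:symmetric-group-actions}(a) gives a lower bound of the form
\[
 V_\psi^{(t+1)} \;\geq\; V_\psi^{(t)} \;+\; \Omega(\eta)\cdot \bigl(V_\psi^{(t)}\bigr)^{q-1}
\]
(with the $\Pr(\cH_\varphi) = \Theta(1/(n_Y n_G))$ factor absorbed into a $\polylog d$ slack). Applying the upper bound of Lemma~\ref{lem:TPM} with $x_0 = d^{c_1}\sigma_0$ and $A = \tfrac{1}{2}\log d$ then yields $T^{2,1}_\psi - T^1_\psi \leq \tO(1/(\eta(d^{c_1}\sigma_0)^{q-2}))$, which is the stated bound.

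For parts (b) and (c) I will invoke the counter-argument in Lemma~\ref{lem:grad-stationarity-symmetry} with two different tolerances. For (b), I choose $\delta = d^{-1/2}$; Lemma~\ref{lem:grad-stationarity-symmetry}(a) then guarantees that $\cC^+_\psi(d^{-1/2})$ fails for no more than $O(\log^2 d/(\eta\, d^{-1/2})) = O(d^{1/2+c_1}/\eta)$ iterations. Since during Phase~II.2 the pre-activations $\Lambda_{5,j,r}$ conditioned on $\cH_\varphi$ are in the linear regime (by $\cF_{\psi,1}$ applied up to $T^{2,1}_\psi$ and Lemma~\ref{lem:grad-estimate-phase-2-symmetry}(d)), we have $\Gamma^{+,(t)}_\psi = \E[\cE^{(t)}_{5,j}\1_{\cH_\varphi}] \pm o(1/\sqrt d)$, so the first iteration where $\cC^+_\psi(d^{-1/2})$ holds is exactly (up to constants) $T^{2,2}_\psi$. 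For (c), I appeal directly to Lemma~\ref{lem:grad-stationarity-symmetry}(c), which already pins down an iteration $T^{2,3}_\psi \leq T^{2,1}_\psi + O(1/(\eta n_G\varpi)) = \tO(1/(\eta\varpi))$ at which the near-perfect feature shape $\cF_\psi\bigl(d^{c_1}\mu,\tO((d\varpi/\lambda)^{1/(q-1)})\bigr)$ is reached; this iteration is by definition $\geq T^{2,3}_\psi$.

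Part (d) is a bookkeeping step: $T^2_\psi$ is the later of $T^{2,3}_\psi$ and $T^{2,2}_\psi + O(1/(\eta(d^{c_1}\mu)^{q-2}))$, and combining (a)--(c) plus the nonoverlapping property of Induction~\ref{induction:symmetric-group-actions}(b) shows both candidates are dominated by $\tO(1/(\eta(d^{2c_1}\mu)^{q-2}))$ after absorbing polylog factors into the $d^{c_1}$ slack. The main obstacle I anticipate is part (b): I need to argue that the counter-argument in Lemma~\ref{lem:grad-stationarity-symmetry}(a) actually translates from $\Gamma^{+,(t)}_\psi$ to the loss-derivative quantity $\E[\cE^{(t)}_{5,j}\1_{\cH_\varphi}]$ used to define $T^{2,2}_\psi$. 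This is the only place where I lose a direct correspondence, and it relies on the $\Lambda_{5,j,r}$ being in the linear regime of $\mathbf{sReLU}$ on the event $\cH_\varphi$ throughout Phase~II.2, which in turn uses Induction~\ref{induction:phase-II-symmetry} and the fact that $\underline V_\psi^{(t)} \geq \tfrac12\log d - o(1)$ after $T^{2,1}_\psi$ by the symmetry analysis already established for $\wt V_{j,r}(\phi)$ in Lemma~\ref{lem:grad-proxy-symmetry}.
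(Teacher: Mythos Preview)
Your proposal is correct and follows essentially the same route as the paper's own proof: part~(a) via the tensor-power-method growth from Lemma~\ref{lem:grad-estimate-phase-2-symmetry}(a) and Lemma~\ref{lem:TPM}, parts~(b)--(c) by invoking Lemma~\ref{lem:grad-stationarity-symmetry} with tolerances $\delta\asymp d^{-1/2}$ and $\delta\asymp n_G\varpi$ respectively, and part~(d) by the definition of $T^2_\psi$. Your explicit discussion of why $\Gamma^{+,(t)}_\psi$ coincides with $\E[\cE^{(t)}_{5,j}\1_{\cH_\varphi}]$ once $\Lambda_{5,j,r}$ is in the linear regime of $\mathbf{sReLU}$ on $\cH_\varphi$ is a point the paper leaves implicit, so your version is in fact slightly more careful there.
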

\begin{proof}
    \begin{enumerate}[(a)]
        \item By \cref{lem:grad-estimate-phase-2-symmetry}a, after phase I, within \(\tO(1/(\eta(d^{c_1}\sigma_0)^{q-2}))\) we have \(V_{j,r}^{(t)}(\phi) \geq \frac{1}{2}\log d\) for all \(\phi \in \varphi\). 
        \item On \([T^1,T^{2,1}]\), \cref{lem:grad-stationarity-symmetry}a gives the rate \(O(d^{\frac12+c_1}/\eta)\) time for the positive grad to arrive at \(1 - \logit_{5,j}\leq \frac{1}{\sqrt{d}}\) conditioned on \(\cH_{\varphi}\).
        \item This is again provided by \cref{lem:grad-stationarity-symmetry}c by using \(\delta = n_G\varpi\), combined with \cref{lem:feature-magnitude-phase-2-symmetry}b for the feature shape guarantees.
        \item This is simply by the definition of \(T_\psi^2\) and that \(T_\psi^{2,3} \ll \frac{1}{\eta(d^{2c_1}\mu)^{q-2}}\).
    \end{enumerate}
\end{proof}

\begin{proof}[Proof of \cref{induction:symmetric-group-actions,induction:phase-II-symmetry}]
    Fix \(\psi=(j,r,\varphi_{j,y})\) and consider \(t\in[T^1_{\psi},T^{2}_{\psi}]\), then
    \begin{itemize}
        \item \Cref{induction:symmetric-group-actions}a: This is guaranteed by \cref{lem:grad-estimate-phase-2-symmetry}a-b.
        \item \Cref{induction:symmetric-group-actions}b: Because the time used is small between \(T_{\psi}^1\) and \( T_\psi^2\) by \cref{lem:arrival-time-estimate-symmetry}, we have that \(T_{\psi'}^1\) is still behind when this happens.
        \item \Cref{induction:symmetric-group-actions}c: This will be proven in phase III, we do not use any properties that violate this hypothesis.
        \item \Cref{induction:symmetric-group-actions}d: This is due to \cref{induction:phase-II-symmetry}a-b and \cref{lem:grad-stationarity-symmetry}a with parameter \(\delta = 1/\sqrt{d}\).
        \item \Cref{induction:phase-II-symmetry}: For any \(\phi \notin \varphi\), we have that they are either desirable features \(\phi \in \varphi_{j,y'}\) for some different \(y'\geq y\in\cY\), or that they are the wrong features for the class \(j\), i.e., \(\phi \in \varphi_{j', y}\) for all \(y\in\cY\). The feature \(\psi' = (j,r,\varphi_{j,y'})\) in the former case was out-competed by \(\psi = (j,r,\phi)\) so their growth is bounded by \(\tO(\mu)\) in the first two stage and negative in the end due to the feature shape \(\cF_\psi(d^{c_1}\mu, (\frac{d\varpi}{\lambda})^{\frac{1}{q-1}})\) at the end as shown in \cref{lem:grad-stationarity-symmetry}c. Combination \(\phi\) in the latter case will be smaller as well due to \cref{lem:grad-stationarity-symmetry}c.
    \end{itemize}
    The time bounds follow from \cref{lem:arrival-time-estimate-symmetry}. This proves the induction in phase II for any feature \(\psi \in \Sigma^\star\).
\end{proof}

\subsection{Phase III: Convergence}

We present the induction hypothesis in this phase.

\begin{induction}[Phase III, final]\label{induction:phase-III-symmetry}
    Let \(\psi = (j,r,\varphi) \in \Sigma^\star\), for \(t \in [T_{\psi}^{2}, T_1]\), the following holds:
    \begin{enumerate}[(a)]
        \item At \(t \in [T_\psi^2, T_{1,1}]\), we have \(\cF_{\psi}(\delta_1, \delta_2)\) holds with \(\delta_1 = d^{c_1}\mu\), \(\delta_2 = \tO((\frac{d\varpi}{\lambda})^{\frac{1}{q-1}})\);
        \item At \(t \in [T_{1,1}, T_1]\), we have \(\cF_{\psi}(\delta_1, \delta_2)\) holds with \(\delta_1 = d^{c_1}\mu\), \(\delta_2 = \tO((\varpi)^{\frac{1}{q-1}})\).
    \end{enumerate}
\end{induction}

We obtain the shape of logits at convergence.

\begin{lemma}[Logit shape at convergence]\label{lem:logit-shape-at-convergence-symmetry}
    Assuming \cref{,induction:phase-III-symmetry}. Let \(\psi_{n_y^2} = \in \Sigma^\star\) be the last of \(\Sigma^\star\), then at \(t \in [T_{\psi_{n_y^2}}^{2,3}, T_1]\), the followings hold:
    \begin{enumerate}[(a)]
        \item For \(\phi \in \Phi^\star_j\), \(\logit_{5,j}^{(t)} \geq 1 - O(\lambda) \) conditioned on \(\cH_\phi\).
        \item For \(\phi \notin\Phi^\star_j\), \(\logit_{5,j}^{(t)} \leq O(\lambda /d) \) conditioned on \(\cH_\phi\).
    \end{enumerate}
\end{lemma}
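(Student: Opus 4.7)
} The plan is to mimic the proof of \Cref{lem:logit-shape-at-convergence} and directly evaluate the softmax logits by reading off the pre-activations $\Lambda_{5,j',r}^{(t)}$ from the feature shape of every $\psi\in\Sigma^{\star}$. By construction $T_{1,1}=T_{\psi_{n_y^2}}^{2}$, and by \Cref{induction:symmetric-group-actions}(b) the intervals $\{[T_{\psi}^{1},T_{\psi}^{2}]\}_{\psi\in\Sigma^{\star}}$ are non-overlapping, so at any $t\in[T_{\psi_{n_y^2}}^{2,3},T_{1}]$ every $\psi=(j,r,\varphi)\in\Sigma^{\star}$ has attained the shape $\cF_{\psi}(\delta_{1},\delta_{2})$ with $\delta_{1}=d^{c_{1}}\mu$ and $\delta_{2}=\tilde O\big((d\varpi/\lambda)^{1/(q-1)}\big)$ (via \Cref{induction:phase-III-symmetry}). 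Let $r_{j,y}\in[m]$ denote the unique neuron with $(j,r_{j,y},\varphi_{j,y})\in\Sigma^{\star}$; by \Cref{def:learning-curriculum-symmetry} every pair $(j,y)\in\tau(\cY)\times\cY$ admits exactly one such $r_{j,y}$.

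The first step is the pre-activation decomposition. Because $\Qb^{(t)}\equiv\mathbf{0}$ throughout Stage~1.1 of Algorithm~\ref{alg:cot-symmetry-training}, attention is uniform on $\{\Zb_{\pred,1},\Zb_{\ans,0}\}$, and \Cref{fact:grad-expression-symmetry}(c) together with \Cref{assump:no-learning-x} imply that only the slots $p\in\{2,5\}$ have nontrivial features for $j'\in\tau(\cY)$ while all other weights, and all weights for $j'\notin\tau(\cY)$, remain at initialization. Thus conditioned on $\cH_{\phi}$ with $\phi=(g,y)$,
\begin{equation*}
\Lambda_{5,j',r}^{(t)} \;=\; \tfrac{1}{2}\bigl(V_{j',r}^{(t)}(g)+V_{j',r}^{(t)}(y)\bigr)+b_{5,j',r}+\tilde O(\sigma_{0}),\qquad F_{5,j'}^{(t)}=\sum_{r\in[m]}\mathbf{sReLU}(\Lambda_{5,j',r}^{(t)}).
\end{equation*}
For $j'\notin\tau(\cY)$ this yields $F_{5,j'}^{(t)}=o(1)$. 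For $j'\in\tau(\cY)$ and $r=r_{j',y}$ the combination $\phi=(g,y)$ lies in $\varphi_{j^{*},y}$ with $j^{*}:=\tau(g\cdot y)$, which either coincides with $\varphi_{j',y}$ (when $j^{*}=j'$) or belongs to $\Phi_{\varphi_{j',y}}^{\dagger}$ via the shared component $y$ (when $j^{*}\neq j'$); the first case invokes $\cF_{\psi_{j',y},1}$ and the second invokes $\cF_{\psi_{j',y},2}$. For $r=r_{j',y''}$ with $y''\neq y$, an analogous case split applies via $\psi_{j',y''}$. For $r\notin\{r_{j',y'}\}_{y'\in\cY}$, \Cref{lem:competition-symmetry} followed by the sustained gradient-stationarity (\Cref{lem:grad-stationarity-symmetry}) keeps $\overline V_{(j',r,\cdot)}^{(t)}\le\tilde O(\mu)$ throughout, contributing at most $o(1)$.

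Assembling these estimates gives the two claims. For part (a), $\phi\in\Phi_{j}^{\star}$ means $j^{*}=j$, so $V_{j,r_{j,y}}^{(t)}(g,y)\ge B-O(\delta_{1})$ by $\cF_{\psi_{j,y},1}$; the other $(j,r)$ contribute only $o(1)$. Hence $F_{5,j}^{(t)}\ge B-\varrho(1-1/q)-o(1)$, while for every $j'\neq j$ the analysis above yields $F_{5,j'}^{(t)}=o(1)$. The softmax together with the identity $\lambda=(d-1)/(d-1+e^{B})$ gives $\logit_{5,j}^{(t)}\ge 1-(1+o(1))\lambda=1-O(\lambda)$. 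For part (b), set $j^{*}=\tau(g\cdot y)\neq j$, apply the part-(a) argument to the class $j^{*}$ to get $F_{5,j^{*}}^{(t)}\ge B-o(1)$, and apply the confounding control $\cF_{\psi_{j,y},2}$ (since $\varphi_{j^{*},y}\in\Phi_{\varphi_{j,y}}^{\dagger}$) to get $F_{5,j}^{(t)}=o(1)$; the softmax now yields $\logit_{5,j}^{(t)}\le e^{o(1)}/(e^{B-o(1)}+d-1)=O(\lambda/d)$.

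The main obstacle is the case $r=r_{j',y''}$ with $y''\neq y$ and $j^{*}\neq j'$: the query combination then falls into $\Sigma_{\psi_{j',y''}}^{\dagger,2}$, which \Cref{def:feature-shape-symmetry} does not explicitly control. I will close this gap by invoking \Cref{lem:grad-stationarity-symmetry}(b) at the stationarity level $\delta=n_{G}\varpi$, which forces $V_{j',r_{j',y''}}^{(t)}(\phi)\le\tilde O(\delta_{2})$ for every $\phi\in\Phi_{\varphi_{j',y''}}^{\dagger}$; the residual combinations in $\Sigma^{\dagger,2}$ are controlled by symmetrizing via the cross-neuron identity in the proof of \Cref{lem:symmetry-of-features-symmetry}, which forbids $\Lambda_{5,j',r_{j',y''}}^{(t)}$ from exceeding $O(\delta_{2})$ without contradicting the bound $\overline V_{\psi_{j',y''}}^{(t)}\le B+O(\delta_{1})$ in $\cF_{\psi_{j',y''},1}$. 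This completes the bound $F_{5,j'}^{(t)}=o(1)$ uniformly over $(j',r)$ and delivers both logit inequalities.
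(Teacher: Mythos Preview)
Your approach is correct and matches the paper's, which simply defers to the proof of \Cref{lem:logit-shape-at-convergence}: once every $\psi\in\Sigma^\star$ has shape $\cF_\psi(\delta_1,\delta_2)$, read off $F_{5,j'}$ for each class and plug into the softmax using $\lambda=(d-1)/(d-1+e^B)$.

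The ``obstacle'' you flag for $r=r_{j',y''}$ with $y''\neq y$ and $g\notin\fiber_{j',y''}$ (so $(g,y)\in\Sigma_{\psi_{j',y''}}^{\dagger,2}$) has a much simpler resolution than the stationarity/symmetry route you propose. Pick any $g''\in\fiber_{j',y''}$ and write
\[
V_{j',r_{j',y''}}(g)+V_{j',r_{j',y''}}(y)
=\bigl[V(g)+V(y'')\bigr]+\bigl[V(g'')+V(y)\bigr]-\bigl[V(g'')+V(y'')\bigr],
\]
where $V\equiv V_{j',r_{j',y''}}$. The first two brackets are $\cF_{\psi_{j',y''},2}$-controlled (each lies in $\Sigma^{\dagger,1}$, sharing one component with $\varphi_{j',y''}$) and are $O(\delta_2)$; the third is $2V_{\psi_{j',y''}}(g'',y'')\ge 2B-O(\delta_1)$ by $\cF_{\psi_{j',y''},1}$. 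Hence $V_{j',r_{j',y''}}(g,y)\le -B+O(\delta_1+\delta_2)$, so $\Lambda_{5,j',r_{j',y''}}^{(t)}\le -B+o(1)$ and the $\mathbf{sReLU}$ contribution is $O(\varpi B)=o(1)$. This is exactly the mechanism used to derive $\cF_{\psi,3}$ in the simply transitive case (proof of \Cref{lem:grad-stationarity-simply}(c)); the symmetry-case feature shape omits a $\Sigma^{\dagger,2}$ item only because it is redundant given $\cF_{\psi,1}$ and $\cF_{\psi,2}$.
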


\begin{proof}
    The proof is very similar to that of \cref{lem:logit-shape-at-convergence}, we do not repeat here.
\end{proof}

\begin{lemma}[Gradient bounds, phase III]\label{lem:grad-bound-phase-3-symmetry}
    Assuming \cref{induction:symmetric-group-actions,induction:phase-III-symmetry} holds, and let \(\psi = (j,r,\varphi)\in\Sigma^\star\), then at \(t\in [T_{1,1}, T_1]\), the followings hold
    \begin{enumerate}[(a)]
        \item \(\Gamma_{\psi}^{(t)}\geq \Omega(\lambda)\) when \(V_\psi^{(t)} \leq B - d^{c_1}\mu\);
        \item \(|\Gamma_{j,r,v}^{-,(t)}| \leq O(\frac{\lambda \varpi}{d n_y}) \) for any \(v \in \cG\cup\cY\);
        \item For \(\phi \in \cup_{y\in\cY}\varphi_{j,y} \), we have
        \begin{align*}
            \Gamma_{j,r,\phi}^{+,(t)} = 
            \begin{cases}
                -\Theta(\lambda\varpi/n_y^2), & \text{if } V_{j,r,\phi'}^{(t)} \in [-B + 2\mu, -\varpi] \\
                \in [-\Theta(\lambda\varpi/n_y^2), -\Theta(\lambda\varpi/dn_y^2)] \cup \{0\}, & \text{if } V_{j,r,\phi'}^{(t)} \in [-B - 2\mu, -B + 2\mu]
            \end{cases}
        \end{align*}
    \end{enumerate}
\end{lemma}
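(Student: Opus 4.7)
The plan is to mirror the outline of Lemma~\ref{lem:grad-bound-phase-3-simply} for the simply transitive case, using the sharp logit separation of Lemma~\ref{lem:logit-shape-at-convergence-symmetry} together with the feature shape $\cF_\psi(\delta_1,\delta_2)$ with $\delta_1 = d^{c_1}\mu,\ \delta_2 = \tO(\varpi^{1/(q-1)})$ guaranteed by Induction~\ref{induction:phase-III-symmetry}(b) throughout $t \in [T_{1,1},T_1]$. In particular, for every $\psi'\in\Sigma^\star$ we have $V_{j,r}^{(t)}(\phi')\in[B-O(d^{c_1}\mu),B+O(\mu)]$ on $\phi'\in\varphi'$ and $|V_{j,r}^{(t)}(\phi')|\leq\tO(\varpi^{1/(q-1)})$ on $\phi'\in\Sigma_{\psi'}^{\dagger,1}$, so the logit conditioned on any appearance event $\cH_\phi$ satisfies $1-\logit_{5,j}^{(t)}\geq \Omega(\lambda)$ when $\phi\in\Phi_j^\star$ and $\logit_{5,j}^{(t)}\leq O(\lambda/d)$ when $\phi\notin\Phi_j^\star$.

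For part (a), I would decompose $\Gamma_\psi^{(t)}=\Gamma_\psi^{+,(t)}-\Gamma_\psi^{-,(t)}$ per Definition~\ref{def:gamma-notation-symmetry}. The premise $V_\psi^{(t)}\leq B - d^{c_1}\mu$ forces at least one base pair $\phi=(g,y)\in\varphi$ to have $V_{j,r}^{(t)}(\phi)$ below the upper sReLU saturation, so $\ReLU'(\Lambda_{5,j,r}^{(t)})=\Omega(1)$ on $\cH_\phi$; combined with $1-\logit_{5,j}^{(t)}\geq\Omega(\lambda)$ and $\Pr(\cH_\phi)=\Theta(1/(n_G n_Y))$, summing contributions over $\phi\in\varphi$ gives $\Gamma_\psi^{+,(t)}\geq\Omega(\lambda)$. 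Part (b) then shows $\Gamma_\psi^{-,(t)}=o(\lambda)$, so subtracting is harmless. For part (b), I would expand $\Gamma_{j,r,v}^{-,(t)}=\E[\logit_{5,j}^{(t)}\ReLU'(\Lambda_{5,j,r}^{(t)})\1_{\cH_\varphi^{\dagger}(v)}]$. On $\cH_\varphi^{\dagger}(v)$ the sample lies in $\cH_{\phi'}$ for some $\phi'\notin\varphi$, hence $\logit_{5,j}^{(t)}\leq O(\lambda/d)$; by Induction~\ref{induction:phase-III-symmetry}(b), the pre-activation on such events is either in the positive near-zero regime with $|\ReLU'|=O(\mu^{q-1})$ or in the negative saturated plateau with $|\ReLU'|\leq\varpi$. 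Union-bounding over the $O(n_Y)$ distinct mismatch base pairs, each weighted by $\Pr(\cH_{\phi'})=\Theta(1/(n_Gn_Y))$, yields $|\Gamma_{j,r,v}^{-,(t)}|\leq O(\lambda\varpi/(dn_Y))$.

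For part (c), fix $\phi=(g,y')\in\varphi_{j,y'}$ with $\varphi_{j,y'}\neq\varphi$; here $\phi$ is still a \emph{correct} base pair for class $j$ at a different value $y'$, so Lemma~\ref{lem:logit-shape-at-convergence-symmetry}(a) gives $1-\logit_{5,j}^{(t)}\geq\Omega(\lambda)$ on $\cH_\phi$, and $\Pr(\cH_\phi)=\Theta(1/(n_Gn_Y))=\Theta(1/n_Y^2)$ by our choice of $n_G$. The sign of $\Gamma_{j,r,\phi}^{+,(t)}$ is then dictated solely by $\ReLU'(\Lambda_{5,j,r})$. When $V_{j,r}^{(t)}(\phi)\in[-B+2\mu,-\varrho]$, the argument on $\cH_\phi$ lands in the $-\varpi$ plateau of the modified sReLU (Assumption~\ref{assump:modify-relu}), producing a stable $-\Theta(\lambda\varpi/n_Y^2)$. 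When $V_{j,r}^{(t)}(\phi)$ sits in the transition band $[-B-2\mu,-B+2\mu]$, the random drift from the embeddings of $x_0,x_1$ (whose contribution to $\Lambda_{5,j,r}$ is $O(\mu)$-scale by the initialization range) pushes the argument across $-B$ with probability that ranges between $1$ and $\Theta(1/d)$; this interpolates between the full plateau contribution and its $1/d$-suppressed version, producing the claimed interval $[-\Theta(\lambda\varpi/n_Y^2),-\Theta(\lambda\varpi/(dn_Y^2))]$, with the endpoint $\{0\}$ corresponding to the case $V_{j,r}^{(t)}(\phi)<-B-2\mu$ where every $\cH_\phi$-sample saturates beyond $-B$.

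The main obstacle will be a clean quantitative treatment of the transition band in part (c): we must show that the conditional law of $\Lambda_{5,j,r}$ given $\cH_\phi$ straddles the threshold $-B$ in a controlled way. This combines the feature-shape bounds from Induction~\ref{induction:phase-III-symmetry}, Gaussian anti-/concentration of $V_{j,r}(x)$ from Lemma~\ref{lem:init-range-feature-symmetry}, and the approximate within-fiber symmetry provided by Lemma~\ref{lem:symmetry-of-features-symmetry}, which ensures that a single scalar $V_{j,r}^{(t)}(\phi)$ controls the location of the whole activation distribution up to lower-order drift. A secondary bookkeeping issue appears in (b): the $O(n_Y)$ distinct mismatch events could in principle inflate the bound, so one must use both the $1/(n_Gn_Y)$ event probability and the $O(\lambda/d)$ logit bound, and verify that the $n_Y$-sum is absorbed by the $n_G=\Theta(\polylog d)$ factor to yield the stated $1/(dn_Y)$ rate.
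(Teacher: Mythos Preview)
Your approach is essentially the paper's: the paper's own proof is a three-line sketch that invokes Lemma~\ref{lem:logit-shape-at-convergence-symmetry} and Induction~\ref{induction:phase-III-symmetry}(b), and you are just filling in the same argument with more detail, mirroring Lemma~\ref{lem:grad-bound-phase-3-simply} as you say.

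One quantitative slip to flag: you write $\Pr(\cH_\phi)=\Theta(1/(n_Gn_Y))=\Theta(1/n_Y^2)$ ``by our choice of $n_G$,'' but in the symmetry setting $n_G=n_Y!$, not $n_Y$, so $\Pr(\cH_\phi)=1/(n_Y!\cdot n_Y)\ll 1/n_Y^2$. The $n_y^2$ in the lemma statement appears to be notation carried over verbatim from the simply-transitive case (where $n_G=n_Y$ and the identity holds); for the symmetry analysis the correct denominator is $n_G n_Y$, and all that matters downstream is that this is a polylog factor dominated by $\lambda\gg\varpi$. Your qualitative argument in (c) is fine once you replace $1/n_Y^2$ by $1/(n_G n_Y)$ throughout, and your bookkeeping concern in (b) is resolved exactly as you suspect: for $v=y$ the mismatch event has probability $\Theta(1/n_Y)$, for $v=g$ it has probability $\Theta(1/n_G)\le\Theta(1/n_Y)$, and in both cases $\logit_{5,j}\le O(\lambda/d)$ and $|\ReLU'|\le\varpi$ give the stated $O(\lambda\varpi/(dn_Y))$.
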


\begin{proof}
    The proof of (a) is simple as \(\lambda \gg \varpi\) which is the upper bound of all other gradient terms. (b) is because of \cref{lem:logit-shape-at-convergence-symmetry}b and that \cref{induction:phase-III-symmetry}b holds which guaranteed suitable feature shape. (c) is simply a combination of (b) and the \cref{lem:logit-shape-at-convergence-symmetry}a, and \cref{induction:phase-III-symmetry}b.
\end{proof}

\subsubsection{Proof of Induction in Phase III and \cref{thm:learning-symmetric-actions}}

\begin{proof}[Proof of \cref{induction:phase-III-symmetry,thm:learning-symmetric-actions}]
    The proof is similar to the proof of \cref{induction:phase-III-simply}, thus we leave out some details here.
    \begin{itemize}
        \item \cref{induction:phase-III-symmetry}a: We know from \cref{induction:phase-II-symmetry,lem:grad-stationarity-symmetry,lem:arrival-time-estimate-symmetry} that \cref{induction:phase-III-symmetry}a holds at \(t = T_{\psi}^2\). The feature shape \(\cF_{\psi}(d^{c_1}\mu, \tO((\frac{d\varpi}{\lambda})^{\frac{1}{q-1}}))\) guaranteed a starting point for reusing the argument in \cref{lem:grad-stationarity-symmetry}c to guarantee that the feature stays at the same feature shape.
        \item \cref{induction:phase-III-symmetry}b: After \(t = T_{1,1} \), which is \(t = T_{\psi_{n_y^2}}^2\) for the last \(\psi \in \Sigma^\star\). We shall prove that the feature shape \(\cF_{\psi}(d^{c_1}\mu, \tO((\varpi)^{\frac{1}{q-1}}))\) holds for every \(\psi \in \Sigma^\star\). Since (a) holds for all \(\psi \in\Sigma^\star\), we have that at \(T_{\psi_{n_y^2}}^{2,3}\) that the feature shape in (a) holds for all \(\psi = (j,r,\varphi) \in \Sigma^\star\). So by \cref{lem:grad-bound-phase-3-symmetry}b and similar argument in \cref{lem:grad-stationarity-simply}c, we can prove that the feature \(|V_{j,r}^{(t)}(\phi)| \leq \tO(\varpi^{\frac{1}{q-1}})\) for any \(\phi \in \Phi_\varphi^\dagger\), which is \(\cF_{\psi,2}(\tO(\varpi^{\frac{1}{q-1}}))\). \(\cF_{\psi,1}(d^{c_1}\mu)\) is guaranteed by \cref{lem:grad-bound-phase-3-symmetry}a and \(\cF_{\psi,3}(\tO(\varpi^{\frac{1}{q-1}}))\) is guaranteed by \(\cF_{\psi,1}, \cF_{\psi,2}\) and \cref{lem:symmetry-of-features-symmetry}. \(\cF_{\psi,4}\) is simple as any feature \(\psi \in \Sigma^{\dagger,3}_\psi\) lost the competition and are bounded by \(\tO(\mu)\). Moreover, once each \(\varphi_{j,y}, j\in\tau(\cY), y\in\cY\) is learned. the feature in \(\Sigma^{\dagger,3}_\psi\) has too small gradient bounded by \(O(\tfrac{\lambda}{d}\mu^{q-1})\) which will not have sufficient growth before \(t = T_1\).
    \end{itemize}
    Since all feature \(\psi \in \Sigma^\star\) have shape \(\cF_{\psi}(\delta_1,\delta_2)\) for \(\delta_1 = d^{c_1}\mu\) and \(\delta_2 = \tO(\varpi^{\frac{1}{q-1}})\) at \(t = T_1\), we have proven \cref{thm:learning-symmetric-actions}.
\end{proof}

\section{Learning the Attention Layer: Simply Transitive Case}
In this section, we consider the case where the group operations form a simply transitive group. According to \Cref{assump:structure-1}, we assume that for any \( y_1, y_2 \in \cY \), there exists a unique \( g \in \cG \) such that \( g \cdot y_1 = y_2 \). Without loss of generality, we let \( \cY = \{0, 1, \dots, n_y - 1\} \), where \( n_y \in [\Omega(\log\log d), \log d] \).

We focus on updating only \( \Qb \), while keeping \( \Wb \) fixed. Combined with the attention structure specified in \Cref{assump-Q-structure}, it suffices to consider the updates to the blocks \( \Qb_{4,3} \) and \( \Qb_{4,4} \) only. We consider the contribution to the gradient from the position \( i = 5 \) on task \( \cT^2 \); specifically, the relevant loss function is given by \(\sum_{\ell=1}^{2} \Loss_{5}^{2,\ell}.\) As \( \Wb \) remains fixed in this section, we omit the superscript \( (t) \) in \( \Wb \) and in all related notations that depend solely on \( \Wb \) (e.g., \( V_{j, r} \)) for notational simplicity.

\subsection{Gradient Computations}
\paragraph{Notations for gradient expressions.} We firsr introduce some notations for the gradients of the attention layer. For $1 \leq \ell\leq L$, given $\Zb^{L,\ell-1}$ and $\kk\in \mathcal{I}^{L, \ell-1}$, define   
   \begin{align}
    & \Xi^{L}_{\ell, i,\kk}(\Zb^{L,\ell-1})\triangleq \sum_{j \in[d]} \Ecal_{i,j}(\Zb^{L,\ell-1}) \sum_{r\in [m]}\ReLU^{\prime}\big(\Lambda_{i, j,r}(\Zb^{L,\ell-1})\big)\dbrack{\Wb_{i,j,r},\Z_{\kk}}, \quad { i\in [5].}\label{eq-def-xi}
    \end{align}
For simplicity of notation, we will henceforth omit the dependence on $\Zb^{L,\ell-1}$ in the notation of $\Xi^{L}_{\ell, i,\kk}$ 
when it is clear from the context.

\begin{fact}[Gradients of \(\Q\)]\label{fact:gradients-Q}
    For any \(p,q \in [5]\), 
    we have 
    \begin{align*}
        &-\nabla_{\Q_{p,q}}\Loss^{L}  = \sum_{\ell=1}^{L}\sum_{i\in [5]} -\nabla_{\Q_{p,q}}\Loss^{L,\ell}_{i}, \quad  \text{ where }\\
      & -\nabla_{\Q_{p,q}}\Loss^{L,\ell}_{i}=\\
       & ~~~~~~~\E\Bigg[\sum_{\mathbf{k} \in \mathcal{I}^{L, \ell-1}}\attn_{{\ans,\ell-1} \rightarrow \kk} \cdot\left(\Xi^{L}_{\ell, i,\kk} - \sum_{\mathbf{k}^{\prime} \in \mathcal{I}^{L, \ell-1}}\attn_{{\ans,\ell-1} \rightarrow \kk^{\prime}}\Xi^L_{\ell, i,\kk^{\prime}}\right)\Z_{\ans,\ell-1,p}\Z_{\kk,q}^{\top} \Bigg].
    \end{align*}
\end{fact}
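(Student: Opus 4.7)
The plan is to derive the stated expression by a direct chain-rule computation, using three standard ingredients: the additive decomposition of the loss, the logit-based form of the softmax cross-entropy gradient, and the softmax Jacobian. First, I would invoke Definition~\ref{def:next-clause-loss} to write $\Loss^L = \sum_{\ell=1}^L \Loss^{L,\ell}$, and then use the product-distribution factorization $p_F = \bigotimes_{i=1}^5 p_{F_i}$ from Definition~\ref{def:transformer-arch} to further split $\Loss^{L,\ell} = \sum_{i=1}^5 \Loss^{L,\ell}_i$. Linearity of $\nabla_{\Q_{p,q}}$ then produces the outer double sum in the claim, reducing the problem to computing $-\nabla_{\Q_{p,q}} \Loss^{L,\ell}_i$ for each fixed pair $(\ell, i)$.

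Second, I would chain from the loss through the FFN to the attention output. The per-token loss is the negative log-softmax of the logits $[F_i(\Zb^{L,\ell-1})]_j$, so the classical cross-entropy identity gives $-\partial_{[F_i]_j}\Loss^{L,\ell}_i = \Ecal_{i,j}$, matching the definition in~\eqref{eq-def-Ecal-icl}. Differentiating $[F_i]_j = \sum_r \mathbf{sReLU}(\Lambda_{i,j,r})$ with respect to the attention output $\mathrm{Attention}(\Qb, \Zb^{L,\ell-1}) = \sum_{\kk} \attn_{\ans,\ell-1 \to \kk}\,\Zb_\kk$ contributes a factor $\mathbf{sReLU}'(\Lambda_{i,j,r})\,\Wb_{i,j,r}$ per neuron. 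Contracting these pieces against $\Zb_\kk$, summing over $j$ and $r$, and isolating the dependence on the attention weight $\attn_{\ans,\ell-1 \to \kk}$ collects precisely the quantity $\Xi^{L}_{\ell,i,\kk}$ defined in~\eqref{eq-def-xi}.

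Third, I would apply the softmax Jacobian to convert $\partial \attn_{\ans,\ell-1 \to \kk} / \partial \Q_{p,q}$ into the centered form appearing in the fact. Writing the pre-softmax score $s_{\kk''} = \Zb_{\ans,\ell-1}^\top \Qb\, \Zb_{\kk''}$, we have $\partial s_{\kk''}/\partial \Q_{p,q} = \Zb_{\ans,\ell-1,p}\,\Zb_{\kk'',q}^\top$, and the standard identity
\[
\frac{\partial\,\attn_{\ans,\ell-1\to\kk}}{\partial s_{\kk''}} = \attn_{\ans,\ell-1\to\kk}\bigl(\delta_{\kk,\kk''} - \attn_{\ans,\ell-1\to\kk''}\bigr)
\]
yields, after summing over $\kk''$ and contracting with the $\Xi$'s, exactly the bracketed difference $\Xi^L_{\ell,i,\kk} - \sum_{\kk'} \attn_{\ans,\ell-1 \to \kk'}\,\Xi^L_{\ell,i,\kk'}$ multiplied by the outer product $\Zb_{\ans,\ell-1,p}\Zb_{\kk,q}^\top$. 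Taking expectations over $\Zb^{L,\ell-1} \sim \cD^{L,\ell-1}$ produces the claimed formula.

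Since this is a routine (if notation-heavy) chain-rule computation with no learning dynamics involved, there is no real obstacle: the only mild care needed is to keep the two key-indices $\kk$ (from differentiating the attention output) and $\kk''$ (from the softmax Jacobian) distinct, and to observe that after relabeling $\kk'' \to \kk'$ in the subtracted term, all $\Xi^L_{\ell,i,\kk'}$ factors pull out of the outer $\kk''$-sum via the identity $\sum_{\kk''} \attn_{\ans,\ell-1\to\kk''}\, f(\kk'') = \sum_{\kk'} \attn_{\ans,\ell-1 \to \kk'}\, f(\kk')$, collapsing the double sum into the stated single-sum expression.
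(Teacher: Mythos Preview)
Your proposal is correct and follows the standard chain-rule computation that the paper has in mind; indeed, the paper states this result as a Fact without proof, treating it as a routine derivation. Your three-step outline (loss decomposition, FFN backpropagation to the attention output yielding $\Xi^L_{\ell,i,\kk}$, and the softmax Jacobian with index relabeling) is exactly the right structure, and your remark about swapping the dummy indices $\kk \leftrightarrow \kk''$ in the subtracted term is the only bookkeeping subtlety worth flagging.
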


\begin{lemma}[Gradients of $\Qb_{4,3}$]\label{lem-gradients-Q43}
    Given $s\in\tau(\X)$, for the diagonal entry \( [\Q_{4,3}]_{s,s} \) of the block \(\Qb_{4,3}\), we have 
    \begin{align*}
    \Big[-\nabla_{\Q_{4,3}}\Loss^{2,1}_{5}\Big]_{s,s}&= \E\Bigg[
    \attn_{{\ans,0} \rightarrow \pred,1} \cdot \bigg(\sum_{j \in[d]} \Ecal_{5,j}(\Zb^{2,0})\sum_{r\in [m]}\ReLU^{\prime}\big(\Lambda_{5, j,r}\big)\cdot  \\
    &~~~~~~~~~~~~~\Big( \dbrack{\Wb_{5,j,r},\Z_{\pred,1}}- \Lambda_{5, j,r}+b_{5,j,r}\Big)\bigg) \1_{s=\tau(x_0)}\Bigg],\\
    \Big[-\nabla_{\Q_{4,3}}\Loss^{2,2}_{5}\Big]_{s,s}&= \E\Bigg[
    \attn_{{\ans,1} \rightarrow \pred,2} \cdot \bigg(\sum_{j \in[d]} \Ecal_{5,j}(\Zb^{2,1})\sum_{r\in [m]}\ReLU^{\prime}\big(\Lambda_{5, j,r}\big)\cdot  \\
    &~~~~~~~~~~~~~\Big( \dbrack{\Wb_{5,j,r},\Z_{\pred,2}}- \Lambda_{5, j,r}+b_{5,j,r}\Big)\bigg) \1_{s=\tau(x_1)}\Bigg].
\end{align*}
Moreover, for the off-diagonal entries \( [\Q_{4,3}]_{s,s'} \) with \( s \neq s' \), we have
    \begin{align*}
    \Big[-\nabla_{\Q_{4,3}}\Loss^{2,1}_{5}\Big]_{s,s'}&= \E\Bigg[
    \attn_{{\ans,0} \rightarrow \pred,2} \cdot \bigg(\sum_{j \in[d]} \Ecal_{5,j}(\Zb^{2,0})\sum_{r\in [m]}\ReLU^{\prime}\big(\Lambda_{5, j,r}\big)\cdot  \\
    &~~~~~\Big( \dbrack{\Wb_{5,j,r},\Z_{\pred,2}}- \Lambda_{5, j,r}+b_{5,j,r}\Big)\bigg) \1_{s=\tau(x_0), s'=\tau(x_1)}\Bigg],\\
    \Big[-\nabla_{\Q_{4,3}}\Loss^{2,2}_{5}\Big]_{s,s'}&= \E\Bigg[
    \attn_{{\ans,1} \rightarrow \pred,1} \cdot \bigg(\sum_{j \in[d]} \Ecal_{5,j}(\Zb^{2,1})\sum_{r\in [m]}\ReLU^{\prime}\big(\Lambda_{5, j,r}\big)\cdot  \\
    &~~~~~\Big( \dbrack{\Wb_{5,j,r},\Z_{\pred,1}}- \Lambda_{5, j,r}+b_{5,j,r}\Big)\bigg) \1_{s=\tau(x_1),s'=\tau(x_0)}\Bigg].
\end{align*}
\end{lemma}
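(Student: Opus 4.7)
The plan is a direct expansion of the gradient identity in the preceding Fact, organized around the block-sparse structure (\Cref{assump-Q-structure}) and the clause-encoding in \Cref{def:lego-encoding,def:clause-representation}. Specializing the Fact to $(p,q)=(4,3)$, the only token ingredients are $\Zb_{\ans,\ell-1,4}$ and $\Zb_{\kk,3}$. By the encoding, $\Zb_{\ans,\ell-1,4} = e_{\tau(x_{\ell-1})}$, while $\Zb_{\kk,3}$ equals $e_{\tau(x_{n-1})}$ when $\kk=(\pred,n)$ and the zero vector when $\kk$ is an answer clause (since answer clauses carry $\blank$ in slot $3$, and $e_{\tau(\blank)} = \mathbf{0}$ by \Cref{def:token-embedding}). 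Hence the sum over $\kk\in\cI^{2,\ell-1}$ collapses to a sum over predicate positions $n\in[2]$.

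Next, reading the $(s,s')$-entry of the outer product $\Zb_{\ans,\ell-1,4}\Zb_{\kk,3}^\top$ by orthonormality of embeddings yields the indicator $\1_{s=\tau(x_{\ell-1})}\cdot\1_{s'=\tau(x_{n-1})}$. Since $x_0,x_1,x_2$ are drawn without replacement (\Cref{assump:lego-data-distribution}), the $\tau(x_i)$ are pairwise distinct almost surely, so each pair $(s,s')$ singles out a \emph{unique} predicate position. For the diagonal $(s,s)$-entry of $\Loss^{2,1}_5$ the constraint $\tau(x_{n-1})=\tau(x_0)$ forces $n=1$, leaving only $\kk=(\pred,1)$; for $\Loss^{2,2}_5$ it gives $n=2$ and $\kk=(\pred,2)$. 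In the off-diagonal case $s\neq s'$ the constraint $\tau(x_{n-1})\neq\tau(x_{\ell-1})$ swaps the survivor: $\Loss^{2,1}_5$ selects $\kk=(\pred,2)$ with $s'=\tau(x_1)$, and $\Loss^{2,2}_5$ selects $\kk=(\pred,1)$ with $s'=\tau(x_0)$, matching exactly the indicator factors in the lemma.

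Finally, I rewrite the centered $\Xi$ difference using $\Lambda_{5,j,r} - b_{5,j,r} = \sum_{\kk'}\attn_{\ans,\ell-1\to\kk'}\dbrack{\Wb_{5,j,r},\Zb_{\kk'}}$ from \eqref{eq-def-Lambda-icl} together with the definition \eqref{eq-def-xi} of $\Xi$. A line of algebra, pulling the $\attn$-average inside the $(j,r)$-sum, gives for any fixed clause $\kk$,
\begin{align*}
\Xi^{2}_{\ell,5,\kk} - \sum_{\kk'\in\cI^{2,\ell-1}}\attn_{\ans,\ell-1\to\kk'}\,\Xi^{2}_{\ell,5,\kk'} \;=\; \sum_{j\in[d]}\Ecal_{5,j}(\Zb^{2,\ell-1})\sum_{r\in[m]}\ReLU'(\Lambda_{5,j,r})\Bigl(\dbrack{\Wb_{5,j,r},\Zb_{\kk}} - \Lambda_{5,j,r} + b_{5,j,r}\Bigr),
\end{align*}
which is precisely the parenthesized factor appearing in each identity once the unique surviving $\kk$ is substituted. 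The argument is essentially bookkeeping; the only care needed is to invoke distinctness of the $\tau(x_i)$ to pick out the contributing predicate clause and to verify that answer-clause terms vanish because their third slot is blank, so no substantive obstacle is anticipated.
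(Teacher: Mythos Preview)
Your proposal is correct and follows essentially the same approach as the paper: specialize the preceding Fact to $(p,q)=(4,3)$, use the clause encoding (blank third slot in answer clauses) plus orthonormality to isolate the unique contributing predicate via the indicator, and then collapse the centered $\Xi$-difference to $\dbrack{\Wb_{5,j,r},\Zb_{\kk}} - \Lambda_{5,j,r} + b_{5,j,r}$ using the definition of $\Lambda$. Your write-up is, if anything, more explicit than the paper's terse derivation.
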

\begin{proof}
    For $\ell=1$, due to \Cref{fact:gradients-Q}, the diagonal entry \( [\Q_{4,3}]_{s,s} \) with \( s \in \tau(\X) \), the expected gradient contribution from \( -\nabla_{\Q_{4,3}} \Loss^{2,1}_{5} \) takes the form
\[
\E\Big[
    \attn_{{\ans,0} \rightarrow \pred,1} \cdot \Big(
        \Xi^2_{\ell, 5,\pred,1}
        - \sum_{\mathbf{k}^{\prime} \in \mathcal{I}^{2, \ell-1}} 
        \attn_{{\ans,\ell-1} \rightarrow \kk^{\prime}} \, \Xi^2_{\ell, 5,\kk^{\prime}}
    \Big)
    \cdot \1_{s = \tau(x_0)}
\Big],
\]
which is nonzero in expectation only when \( s = \tau(x_0) \). Therefore, combined the definition of $\Xi$ in \eqref{eq-def-xi}  we have:
\begin{align*}
    \Big[-\nabla_{\Q_{4,3}}\Loss^{2,1}_{5}\Big]_{s,s}
    &= \E\Bigg[
    \attn_{{\ans,0} \rightarrow \pred,1} \cdot\Big(\Xi^2_{\ell, 5,\pred,1} - \sum_{\mathbf{k}^{\prime} \in \mathcal{I}^{2, \ell-1}}\attn_{{\ans,\ell-1} \rightarrow \kk^{\prime}}\Xi^2_{\ell, 5,\kk^{\prime}}\Big) \1_{s=\tau(x_0)}\Bigg]\\
    &= \E\Bigg[
    \attn_{{\ans,0} \rightarrow \pred,1} \cdot \bigg(\sum_{j \in[d]} \Ecal_{5,j}\sum_{r\in [m]}\ReLU^{\prime}\big(\Lambda_{5, j,r}\big)\cdot  \\
    &~~~~~~~~~~~\Big( \dbrack{\Wb_{5,j,r},\Z_{\pred,1}}- \sum_{\mathbf{k}^{\prime} \in \mathcal{I}^{2, \ell-1}}\attn_{{\ans,\ell-1} \rightarrow \kk^{\prime}}\dbrack{\Wb_{5,j,r},\Z_{\kk'}}\Big)\bigg) \1_{s=\tau(x_0)}\Bigg]\\
    &= \E\Bigg[
    \attn_{{\ans,0} \rightarrow \pred,1} \cdot \bigg(\sum_{j \in[d]} \Ecal_{5,j}\sum_{r\in [m]}\ReLU^{\prime}\big(\Lambda_{5, j,r}\big)\cdot  \\
    &~~~~~~~~~~~~~~~~~\Big( \dbrack{\Wb_{5,j,r},\Z_{\pred,1}}- \Lambda_{5, j,r}+b_{5,j,r}\Big)\bigg) \1_{s=\tau(x_0)}\Bigg].
\end{align*}
Other quantities are computed similarly, and thus is omitted here.
\end{proof}

\begin{lemma}[Gradients of $\Qb_{4,4}$]\label{lem-gradients-Q44}
    Given $s\in\tau(\X)$, for the diagonal entry \( [\Q_{4,4}]_{s,s} \) of the block \(\Qb_{4,4}\), we have 
    \begin{align*}
    \Big[-\nabla_{\Q_{4,4}}\Loss^{2,1}_{5}\Big]_{s,s}&= \E\Bigg[
    \attn_{{\ans,0} \rightarrow \ans,0} \cdot \bigg(\sum_{j \in[d]} \Ecal_{5,j}(\Zb^{2,0})\sum_{r\in [m]}\ReLU^{\prime}\big(\Lambda_{5, j,r}\big)\cdot  \\
    &~~~~~~~~~~~~~\Big( \dbrack{\Wb_{5,j,r},\Z_{\ans,0}}- \Lambda_{5, j,r}+b_{5,j,r}\Big)\bigg) \1_{s=\tau(x_0)}\Bigg],\\
    \Big[-\nabla_{\Q_{4,4}}\Loss^{2,2}_{5}\Big]_{s,s}&= \E\Bigg[
    \attn_{{\ans,1} \rightarrow \ans,1} \cdot \bigg(\sum_{j \in[d]} \Ecal_{5,j}(\Zb^{2,1})\sum_{r\in [m]}\ReLU^{\prime}\big(\Lambda_{5, j,r}\big)\cdot  \\
    &~~~~~~~~~~~~~\Big( \dbrack{\Wb_{5,j,r},\Z_{\ans,1}}- \Lambda_{5, j,r}+b_{5,j,r}\Big)\bigg) \1_{s=\tau(x_1)}\Bigg].
\end{align*}
Moreover, for the off-diagonal entries \( [\Q_{4,4}]_{s,s'} \) with \( s \neq s' \), we have $[-\nabla_{\Q_{4,4}}\Loss^{2,1}_{5}]_{s,s'}=0$, and
    \begin{align*}
    \Big[-\nabla_{\Q_{4,4}}\Loss^{2,2}_{5}\Big]_{s,s'}&= \E\Bigg[
    \attn_{{\ans,1} \rightarrow \ans,0} \cdot \bigg(\sum_{j \in[d]} \Ecal_{5,j}(\Zb^{2,1})\sum_{r\in [m]}\ReLU^{\prime}\big(\Lambda_{5, j,r}\big)\cdot  \\
    &~~~~~\Big( \dbrack{\Wb_{5,j,r},\Z_{\ans,0}}- \Lambda_{5, j,r}+b_{5,j,r}\Big)\bigg) \1_{s=\tau(x_1),s'=\tau(x_0)}\Bigg].
\end{align*}
\end{lemma}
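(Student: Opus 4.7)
The statement is the $\Qb_{4,4}$ analogue of the preceding Lemma \ref{lem-gradients-Q43}, so the plan is to specialize the master gradient formula in Fact \ref{fact:gradients-Q} to the block $(p,q) = (4,4)$ and to track which positions can contribute under our clause encoding. By Definition \ref{def:lego-encoding}, the fourth coordinate of any \emph{predicate} clause embedding is the blank token, so $\Zb_{\pred,\ell',4} = \mathbf{0}$ for every $\ell'$; consequently, when one takes the outer product $\Zb_{\ans,\ell-1,4}\Zb_{\kk,4}^{\top}$ appearing in Fact \ref{fact:gradients-Q}, only the answer clauses $\kk \in \{(\ans,0),\ldots,(\ans,\ell-1)\}$ survive. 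This immediately localizes the sum over $\mathcal{I}^{2,\ell-1}$ to a handful of terms.

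Next, I would use the identity $\Zb_{\ans,\ell',4} = e_{\tau(x_{\ell'})}$ together with the orthonormality of the embedding set in Definition \ref{def:token-embedding} to read off the $(s,s')$ entries. Selecting the $(s,s')$ entry of the outer product yields the indicator $\mathbf{1}_{s = \tau(x_{\ell-1}),\, s' = \tau(x_{\ell'})}$ for the key $\kk = (\ans,\ell')$. For the diagonal ($s = s'$), Assumption \ref{assump:lego-data-distribution} (variables sampled without replacement, hence $x_0 \neq x_1$ almost surely) forces the only surviving key to be $\kk = (\ans,\ell-1)$, giving the diagonal formulas with indicators $\mathbf{1}_{s=\tau(x_0)}$ for $\ell=1$ and $\mathbf{1}_{s=\tau(x_1)}$ for $\ell=2$. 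For the off-diagonal ($s \neq s'$) at $\ell=1$, the only candidate key $(\ans,0)$ already pins $s = s' = \tau(x_0)$, so the entry vanishes identically; at $\ell=2$, the candidate $(\ans,0)$ produces exactly the indicator $\mathbf{1}_{s=\tau(x_1),\,s'=\tau(x_0)}$ claimed in the statement.

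Finally, I would simplify the bracket $\Xi^{L}_{\ell,5,\kk} - \sum_{\kk'}\attn_{\ans,\ell-1\to\kk'}\Xi^{L}_{\ell,5,\kk'}$ exactly as in the proof of Lemma \ref{lem-gradients-Q43}: substituting the definition \eqref{eq-def-xi} of $\Xi$ and using the definition \eqref{eq-def-Lambda-icl} of $\Lambda_{5,j,r}$ together with the fact that attention weights sum to one gives
\[
\sum_{\kk' \in \mathcal{I}^{2,\ell-1}} \attn_{\ans,\ell-1 \to \kk'}\,\dbrack{\Wb_{5,j,r},\Zb_{\kk'}} \;=\; \Lambda_{5,j,r} - b_{5,j,r}.
\]
Plugging this back turns the inner difference into $\dbrack{\Wb_{5,j,r},\Zb_{\kk}} - \Lambda_{5,j,r} + b_{5,j,r}$ with $\kk$ equal to the unique surviving key in each case, which is exactly the form claimed.

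\textbf{Main obstacle.} There is no serious analytic obstacle; the step requiring the most care is the case analysis that produces the vanishing off-diagonal at $\ell=1$ and the $x_0 \neq x_1$ argument that rules out the $(\ans,0)$ contribution to the $(s,s)$ diagonal at $\ell=2$. Everything else parallels Lemma \ref{lem-gradients-Q43} almost verbatim, with ``third coordinate'' replaced by ``fourth coordinate'' and predicate clauses swapped for answer clauses.
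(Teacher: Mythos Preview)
Your proposal is correct and follows the same approach as the paper, which simply says the analysis is similar to the $\Qb_{4,3}$ lemma and omits the details. Your explicit identification of the key structural difference—that the fourth coordinate of predicate clauses is blank, forcing only answer-clause keys to survive—and the without-replacement argument ruling out the $(\ans,0)$ contribution to the diagonal at $\ell=2$ are precisely the observations that make the $\Qb_{4,4}$ case work; the paper leaves these implicit.
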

\begin{proof}
    The analysis is similar to that of \Cref{lem-gradients-Q43}, and we omit the details here.
\end{proof}

\subsection{Some Useful Bounds for Gradients}
In this subsection, we establish several useful bounds on the gradients of the attention layer, leveraging the feature structure of the MLP layer learned during stage 1. These bounds will be instrumental for the subsequent analysis.

As established in \Cref{lem:symmetry-of-features-simply} and \Cref{thm:mlp-simply-transitive}, at the end of stage 1, the network exhibits the following activation properties:
\begin{itemize}[leftmargin=2em]
    \item \textbf{Sparse activations:} For each \( j \in \tau(\cY) \), and any feature \((g, y) \in \fF_j\), there exists a unique \emph{activated} neuron \( r \in [m] \) such that, under the event \( g_1 = g \) and \( y_0 = y \), the following holds:
    \begin{align*}
        \Lambda_{5,j,r}^{(T_1)} &\geq B - O(\delta), \quad |V_{j,r}^{(T_1)}(g) - V_{j,r}^{(T_1)}(y)| \leq O(\delta) \\
        \Lambda_{5,j,r'}^{(T_1)} &\leq O(\delta^{q-1}) \quad \text{for all } r' \neq r.
    \end{align*}

    \item \textbf{Cancellation of incorrect features:} let \( r \in [m] \) be the activated neuron associated with \( (g, y) \in \fF_j \). Then for any \( g' \neq g \in \cG \) and any \( y' \in \cY \), we have:
    \begin{align*}
        \left| V_{j,r}^{(T_1)}(g) + V_{j,r}^{(T_1)}(y') \right| &\leq O(\delta), \\
        \left| V_{j,r}^{(T_1)}(g') + V_{j,r}^{(T_1)}(y) \right| &\leq O(\delta),
    \end{align*}
\end{itemize}
In the analysis of FFN layer,  we have a pair \(\delta=(\delta_1,\delta_2)\); in the
expressions above, some terms should use \(\delta_1\) and others \(\delta_2\).
For brevity, we write \(\delta := \max\{\delta_1,\delta_2\}\) and bound all such
terms by \(\delta\). Since both \(\delta_1\) and \(\delta_2\) are small, this
simplification does not affect our analysis.
\paragraph{Notations for activated neurons.} Since in the simply transitive case, the feature sets are disjoint across indices—i.e., \( \fF_j \cap \fF_{j'} = \emptyset \) for all \( j \neq j' \)—we denote the activated neuron corresponding to \( (g, y) \in \fF_j \) by \( r_{g \cdot y} \). Moreover, let 
\begin{align*}
\fA\triangleq\cup_{j\in\tau(\Y)}\fA_{j}, \text{ where } \fA_j\triangleq \{r\mid \exists (g,y)\in\fF_j, r=r_{g\cdot y}\}.
\end{align*}
In other words, \( \fA \) is the set of all activated neurons across all feature sets \( \fF_j \) for \( j \in \tau(\Y) \). Given $\Zb^{L,\ell-1}$, letting $\hat{\cG}(\Zb^{L,\ell-1})=\cup\{g_{\ell'}\}_{\ell'=1}^L$ be the collection of all the chosen group elements in the predicate clauses. Similarly $\hat{\cY}=\cup\{y_{\ell'}\}_{\ell'=0}^{\ell-1}$. Then define $\hat{\fA}_{j}(\Zb^{L,\ell-1})= \{r_{g\cdot y}\mid (g,y)\in\fF_j\wedge \big(g\in \hat{\cG}(\Zb^{L,\ell-1})\vee y\in \hat{\cY}\big) \}$. For simplicity, we omit the dependence on \( \Zb^{L,\ell-1} \) in the notation of \( \hat{\fA}_j \) when it is clear from the context. Equipped with these notations, we can summarize the above properties in the following lemmas.
\begin{lemma}[Properties of target feature magnitude]\label{lem-prop-psi-cyc}
   Given \( (g, y) \in \fF_j \) with $j\in\tau(\Y)$, then, the following properties hold.
    \begin{align}
    &\frac{1}{2} \left( V_{j, r_{g \cdot y}}(g) + V_{j, r_{g \cdot y}}(y) \right) \geq B - O(\delta), \quad
    \left| V_{j, r_{g \cdot y}}(g) - V_{j, r_{g \cdot y}}(y) \right| \leq O(\delta), \label{attn-init-prop-cyc-1} \\
    &\left| V_{j, r_{g \cdot y}}(g) + V_{j, r_{g \cdot y}}(y') \right| \leq O(\delta), V_{j, r_{g \cdot y}}(y')<0 \quad \text{for all } y' \neq y, \label{attn-init-prop-cyc-2}\\
   & \left| V_{j, r_{g \cdot y}}(g') + V_{j, r_{g \cdot y}}(y) \right| \leq O(\delta), V_{j, r_{g \cdot y}}(g')<0 \quad \text{for all } g' \neq g.\label{attn-init-prop-cyc-3}\\
   &\left|V_{j, r}(g)\right|, \left| V_{j, r}(y) \right| \leq O(\delta) \quad \text{for all } r \notin \fA_{j}. \label{attn-init-prop-cyc-4}
\end{align}
\end{lemma}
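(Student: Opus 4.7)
The plan is to derive this lemma as a direct consequence of Theorem~\ref{thm:mlp-simply-transitive}, with the main work being to set up the correspondence between the activated-neuron notation $r_{g\cdot y}$ used here and the learning curriculum $\Sigma^{\star}$ from that theorem. By the construction of $\fA_j$ together with Definition~\ref{def:learning-curriculum-simply}, for each target pair $(g,y)\in \fF_j$ there is a unique neuron index $r_{g\cdot y}\in[m]$ such that $\psi := (j, r_{g\cdot y}, (g,y))$ lies in $\Sigma^{\star}$, and conversely every element of $\Sigma^{\star}$ arises this way. Throughout the argument I will write $\delta := \max\{\delta_1,\delta_2\}$ with $\delta_1 = d^{c_1}\mu$ and $\delta_2 = \varpi^{1/(q-1)}$, both dominated by $\delta$.

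For \eqref{attn-init-prop-cyc-1}, the lower bound $\tfrac{1}{2}(V_{j,r_{g\cdot y}}(g)+V_{j,r_{g\cdot y}}(y)) = V_\psi \geq B - O(\delta_1)$ is precisely Theorem~\ref{thm:mlp-simply-transitive}(A), and the symmetry $|V_{j,r_{g\cdot y}}(g)-V_{j,r_{g\cdot y}}(y)| \leq \tO(\delta_2)$ is part (E). For \eqref{attn-init-prop-cyc-2}, given $y'\neq y$ the index $\psi' := (j, r_{g\cdot y}, (g,y'))$ belongs to $\Sigma_\psi^{\dagger,1}$ since it shares exactly one component with $(g,y)$. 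Part (B) then gives $|V_{\psi'}| = \tfrac{1}{2}|V_{j,r_{g\cdot y}}(g)+V_{j,r_{g\cdot y}}(y')|\leq \tO(\delta_2)$, which yields the claimed magnitude bound. The sign claim $V_{j,r_{g\cdot y}}(y')<0$ then follows immediately: combining (A) and (E) gives $V_{j,r_{g\cdot y}}(g)\geq B - O(\delta)$, hence $V_{j,r_{g\cdot y}}(y')\leq -B + O(\delta)<0$. The proof of \eqref{attn-init-prop-cyc-3} is symmetric, using $(g', y)\in\Sigma_\psi^{\dagger,1}$ in place of $(g, y')$.

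For \eqref{attn-init-prop-cyc-4}, suppose $r\notin\fA_j$, so $r$ is not the activated neuron for any $(g,y)\in \fF_j$. For any pair $(g,y)$, the index $(j,r,(g,y))$ falls into $\Sigma_{\psi'}^{\dagger,3}$ for the unique $\psi'\in\Sigma^{\star}$ corresponding to the same combination in its assigned neuron, so part (D) of Theorem~\ref{thm:mlp-simply-transitive} gives a bound on the combination $V_{j,r}(g,y)$. To obtain the \emph{individual} bounds $|V_{j,r}(g)|, |V_{j,r}(y)| \leq O(\delta)$ I will invoke Induction~\ref{induction:individual-feature-simply}(d) together with the observation from Phase~I analysis (\Cref{lem:competition-simply}) that for such a neuron neither the token-weight $V_{j,r}(g)$ nor $V_{j,r}(y)$ ever enters a sustained growth regime: the positive gradient $\Gamma^{+,(t)}_{j,r,v}$ for $v\in\{g,y\}$ is dominated by that of the winning neuron $r_{g\cdot y}$ and the loss derivative at $j$ is suppressed after $r_{g\cdot y}$ has learned the pair, so these coordinates stay at their initialization scale $\tO(\sigma_0) = O(\delta)$.

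The main obstacle is the fourth assertion: Theorem~\ref{thm:mlp-simply-transitive} is stated in terms of the combined feature $V_\psi = \tfrac{1}{2}(V_{j,r}(g)+V_{j,r}(y))$ rather than the two individual correlations, so a smallness bound on the sum does not by itself separate the summands. Resolving this requires inspecting the proof of the theorem to extract per-token control for non-activated neurons, which is not stated in the theorem's conclusion but follows from the induction hypotheses used in its proof.
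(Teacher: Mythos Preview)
Your proposal is correct and matches the paper's approach: the paper presents Lemma~\ref{lem-prop-psi-cyc} as a direct summary of the properties ``established in \Cref{lem:symmetry-of-features-simply} and \Cref{thm:mlp-simply-transitive}'' without a separate formal proof, and your derivation of parts \eqref{attn-init-prop-cyc-1}--\eqref{attn-init-prop-cyc-3} from items (A), (B), (E) of Theorem~\ref{thm:mlp-simply-transitive} is exactly the intended correspondence. Your identification of the subtlety in \eqref{attn-init-prop-cyc-4}---that the theorem bounds combinations $V_\psi$ rather than individual token-correlations---and your resolution via the Phase~I competition analysis is also right, and the paper handles this the same way (implicitly, by folding it into the ``sparse activations'' narrative preceding the lemma rather than stating a separate argument).
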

\begin{lemma}[Properties of irrelevant magnitude] \label{lem-prop-irrelavant-cyc}
    If \( (p,v) \notin \{2\} \times \cG \cup \{5\} \times \cY \), or \( j \notin \tau(\Y) \), then for any \( r \in [m] \), we have
    \begin{align}
       \big| \langle \Wb_{5,j,r,p}, e_v\rangle\big|\leq \tilde{O}(\sigma_0).
    \end{align}
\end{lemma}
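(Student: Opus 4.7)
My plan is to partition the statement into three disjoint cases according to the source of irrelevance of the coordinate $\langle\Wb_{5,j,r,p}, e_v\rangle$, reducing each either to a no-gradient or to a tiny-accumulated-gradient argument. The first two cases are essentially bookkeeping; the third is the substantive case and tracks closely the strategy of \Cref{lem-nontarget}, which handled the analogous claim for $\Wb_4$.

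First, if $p \in \{1,3,4\}$, Assumption~\ref{assump:no-learning-x} freezes these slices at their Gaussian initialization throughout stage~1; by \Cref{fact:range-parameter-simply}(a) we have $|\langle\Wb^{(0)}_{5,j,r,p}, e_v\rangle| \leq O(\sigma_0\sqrt{\log d}) = \tilde O(\sigma_0)$ for every $v$, and this bound is preserved trivially. Second, if $p=2$ with $v\notin \cG$, or $p=5$ with $v\notin\cY$, then the embedding $e_v$ never occupies slot $p$ of any clause drawn from $\cD^{1,0}$ (slot~2 carries either a group element or the zero vector $e_\blank$, and likewise slot~5 carries either a value or $e_\blank$). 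Consequently \Cref{fact:grad-computation-simply}(c) gives that the scalar gradient $\langle -\nabla_{\Wb_{5,j,r,p}}\Loss, e_v\rangle$ is identically zero, and the coordinate is frozen at its $\tilde O(\sigma_0)$ initialization.

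The substantive case is $j \notin \tau(\cY)$ with $(p,v)\in\{2\}\times\cG \cup \{5\}\times\cY$. Because the target fifth token always lies in $\cY$, the indicator in $\cE_{5,j}$ vanishes, and \Cref{fact:grad-computation-simply}(a)--(b) yields the single-signed expression
\[
\langle -\nabla_{\Wb_{5,j,r,p}}\Loss, e_v\rangle \;=\; -\tfrac{1}{2}\,\E\!\left[\logit_{5,j}^{(t)}\,\mathbf{sReLU}'(\Lambda_{5,j,r}^{(t)})\,\1_{\cH_v}\right],
\]
which drives the coordinate only downward. I then run a two-regime bootstrap. While $\langle\Wb^{(t)}_{5,j,r,p}, e_v\rangle \geq -\Theta(\mu)$, the pre-activation $\Lambda_{5,j,r}$ conditioned on $\cH_v$ is controlled by $O(\mu)$ via the inductive bounds on the other slots (Induction~\ref{induction:mlp-simply-transitive}), so $\mathbf{sReLU}'(\Lambda_{5,j,r}) = O(\mu^{q-1})$; combined with the lower bound $\logit_{5,j} \leq O(\lambda/d)$ from \Cref{fact:logit-lower-bound} (once some correct class fires), the per-step movement is $O(\eta\,\lambda\,\mu^{q-1}/d)$, which, summed over $T_1 = \poly(d)/\eta$ iterations, stays within $\tilde O(\sigma_0)$. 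If the coordinate ever dips toward $-\Theta(\mu)$, then $\Lambda_{5,j,r}|_{\cH_v}$ enters the negative region of $\mathbf{sReLU}$ where $|\mathbf{sReLU}'|\leq\varpi$, further suppressing the gradient; mirroring the threshold argument of \Cref{lem-nontarget}, this prevents the coordinate from crossing $-O(\mu)$ and keeps $|\langle\Wb^{(t)}_{5,j,r,p}, e_v\rangle| \leq \tilde O(\sigma_0)$ throughout.

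\textbf{Main obstacle.} The delicate point is closing the coupling in Case~3: the bound on $\Lambda_{5,j,r}$ aggregates all five slot inner products, so this lemma cannot be proved in isolation from the feature-shape dynamics. I would therefore embed \Cref{lem-prop-irrelavant-cyc} directly into Induction~\ref{induction:mlp-simply-transitive} and verify it inductively together with the main guarantees, exactly as \Cref{lem-nontarget} was folded into the induction for $\Wb_4$; this bookkeeping across the joint induction, rather than any new analytical idea, is the main work.
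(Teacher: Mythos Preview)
Your proposal is correct and matches what the paper does. The paper does not give a standalone proof of this lemma; it is stated as a summary of properties established during the Stage-1 FFN training (the text introduces it with ``As established in \Cref{lem:symmetry-of-features-simply} and \Cref{thm:mlp-simply-transitive}\ldots''), and the only explicit argument for the $j\notin\tau(\cY)$ case in the paper is the analogous \Cref{lem-nontarget} for $\Wb_4$, which is exactly what you invoke. Your three-case decomposition---frozen slots via \Cref{assump:no-learning-x}, zero gradient via \Cref{fact:grad-computation-simply}(c), and a self-stabilizing one-signed gradient for $j\notin\tau(\cY)$---is the right unpacking, and your identification of the main obstacle (that the $\Lambda_{5,j,r}$ bound must be maintained jointly with the feature-shape induction) is the key structural point.

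Two small corrections. First, \Cref{fact:logit-lower-bound} gives \emph{lower} bounds on logits, not the upper bound $\logit_{5,j}\le O(\lambda/d)$ you need; that upper bound follows instead from the direct softmax estimate once $F_{5,j}=o(1)$ for $j\notin\tau(\cY)$ (which is part of your joint induction). Second, the mechanism when $\Lambda_{5,j,r}\!\mid_{\cH_v}$ enters the negative region is more precisely a \emph{sign flip} of the gradient (since $\mathbf{sReLU}'(x)<0$ there, making $-\logit\cdot\mathbf{sReLU}'>0$), which actively pushes the coordinate back up rather than merely suppressing its descent; this is also how the paper's \Cref{lem-nontarget} should be read, and it gives the barrier at $-O(\mu)$ without any rate-accumulation bookkeeping.
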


The above lemmas give us some direct computations of the inner products between the weight matrices and input embedding vectors.

\begin{lemma}
Let \( j \in \tau(\cY) \) and \( \ell \in [2] \). Then for any \( r \in [m] \), the following holds:
\begin{align}
    \langle \Wb_{5,j,r}, \Zb_{\pred,\ell} \rangle &= V_{j,r}(g_{\ell}) \pm \tilde{O}(\sigma_0), \label{eq-inner-product-1} \\
    \langle \Wb_{5,j,r}, \Zb_{\ans,\ell-1} \rangle &= V_{j,r}(y_{\ell-1}) \pm \tilde{O}(\sigma_0). \label{eq-inner-product-2}
\end{align}
Moreover, for \( j \notin \tau(\cY) \) and any \( \kk \in \cI^{2,1} \) and \( r \in [m] \), we have
\begin{align}
    \big| \langle \Wb_{5,j,r}, \Zb_{\kk} \rangle \big| = \tilde{O}(\sigma_0). \label{eq-inner-product-3}
\end{align}
\end{lemma}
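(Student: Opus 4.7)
The plan is to prove the lemma by the standard five-block decomposition of the clause embeddings together with the two structural lemmas on the FFN weights. Recall from \Cref{def:lego-encoding,def:token-embedding,def:clause-representation} that, after encoding, the predicate and answer clauses have the block form
\begin{align*}
\Zb_{\pred,\ell} = (e_{x_\ell},\, e_{g_\ell},\, e_{x_{\ell-1}},\, \mathbf{0},\, \mathbf{0}) \in \RR^{5d}, \qquad
\Zb_{\ans,\ell-1} = (\mathbf{0},\, \mathbf{0},\, \mathbf{0},\, e_{x_{\ell-1}},\, e_{y_{\ell-1}}) \in \RR^{5d},
\end{align*}
because the blank token has zero embedding. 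Writing $\Wb_{5,j,r} = (\Wb_{5,j,r,1},\dots,\Wb_{5,j,r,5})$ according to the same block structure, the inner products immediately reduce to a small number of coordinate-level terms which I will match against the definitions of $V_{j,r}(g)$ and $V_{j,r}(y)$.

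Concretely, for \eqref{eq-inner-product-1} the expansion gives three nonzero contributions: $\langle \Wb_{5,j,r,1}, e_{x_\ell}\rangle$, $\langle \Wb_{5,j,r,2}, e_{g_\ell}\rangle$, and $\langle \Wb_{5,j,r,3}, e_{x_{\ell-1}}\rangle$. By definition, the middle term equals exactly $V_{j,r}(g_\ell)$. The other two terms are indexed by $(p,v) \in \{(1,x_\ell),(3,x_{\ell-1})\}$, which both fall outside the target index set $\{2\}\times\cG \cup \{5\}\times\cY$ because $x_\ell, x_{\ell-1} \in \cX$, and hence \Cref{lem-prop-irrelavant-cyc} bounds each by $\tilde O(\sigma_0)$. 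For \eqref{eq-inner-product-2}, only two block contributions survive, $\langle \Wb_{5,j,r,4}, e_{x_{\ell-1}}\rangle$ and $\langle \Wb_{5,j,r,5}, e_{y_{\ell-1}}\rangle$; the second is $V_{j,r}(y_{\ell-1})$ by definition, and the first is again controlled by \Cref{lem-prop-irrelavant-cyc} since $(4,x_{\ell-1})$ lies outside the target index set. Collecting these decompositions yields both equalities up to $\tilde O(\sigma_0)$.

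For \eqref{eq-inner-product-3}, when $j \notin \tau(\cY)$, \Cref{lem-prop-irrelavant-cyc} applies uniformly to \emph{every} block $(p,v)$ regardless of $p$, so each of the at most five inner-product contributions from any clause $\Zb_\kk$ with $\kk \in \cI^{2,1}$ is $\tilde O(\sigma_0)$; summing a constant number of such terms preserves the bound.

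There is no substantive obstacle here: the lemma is a bookkeeping step that cleanly separates the two \emph{informative} slots (the action token in position $2$ and the value token in position $5$) from the \emph{distractor} slots, and its proof is entirely a consequence of \Cref{lem-prop-psi-cyc,lem-prop-irrelavant-cyc} together with the block sparsity of the clause embeddings induced by the blank tokens. The only care needed is to verify that every index pair $(p,v)$ appearing in the decomposition other than the two target ones indeed satisfies the hypothesis of \Cref{lem-prop-irrelavant-cyc}, which is immediate from $x_\ell \in \cX$ and the placement of blanks in the encoding.
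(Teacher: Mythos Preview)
Your proposal is correct and follows essentially the same approach as the paper's proof: expand the inner product via the five-block decomposition of the clause embedding, identify the single informative term as $V_{j,r}(\cdot)$ by definition, and bound each remaining term by \Cref{lem-prop-irrelavant-cyc}. Your write-up is in fact more detailed than the paper's, which only carries out the $\Zb_{\pred,\ell}$ case explicitly and defers \eqref{eq-inner-product-2} and \eqref{eq-inner-product-3} to ``similar.''
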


\begin{proof}
    By direct computations and the definition of $\Zb_{\pred,\ell}$, we have
    \begin{align*}
        \langle \Wb_{5,j,r}, \Zb_{\pred,\ell}\rangle = \langle \Wb_{5,j,r,1}, e_{x_{\ell}}\rangle+ \langle \Wb_{5,j,r,2}, e_{g_{\ell}}\rangle+ \langle \Wb_{5,j,r,3}, e_{x_{\ell-1}}\rangle
    \end{align*}
    Plug in \Cref{lem-prop-irrelavant-cyc} and the definition of $V_{j,r}(g_{\ell})$, we obtain \eqref{eq-inner-product-1}. The proof of \eqref{eq-inner-product-2} and \eqref{eq-inner-product-3} is similar, and we omit the details here.
\end{proof}

Furthermore, we can establish some characterizations of the \(\Lambda_{5,j,r}(\Zb^{2,\ell-1})\) quantities, which are crucial for  the following analysis. 
\begin{lemma}[Characterizations of Lambda]\label{lem-lambda-char}
    Given $\Zb^{2,\ell-1}$ with $\ell\in [2]$, with an attention structure \( \{\attn_{\ans,\ell-1 \to \kk} \}_{\kk\in\cI^{2,\ell-1}}\), 
    \begin{enumerate}[(a)]
        \item for \( j \in \tau(\cY) \), for activated neuron $r\in\fA_j$, we have  
        \begin{align*}
            \Lambda_{5,j,r}(\Zb^{2,\ell-1})=\sum_{\ell'=1}^2 \attn_{\ans,\ell-1 \to \pred,\ell'}V_{j,r}(g_{\ell'})+  \sum_{\ell'=1}^{\ell} \attn_{\ans,\ell-1 \to \ans,\ell'-1} V_{j,r}(y_{\ell'-1})\pm  \tilde{O}(\sigma_0). 
        \end{align*}
         \item for \( j \in \tau(\cY) \), for any non-activated neuron $r\notin\fA_j$ 
         we have
      \begin{align*}
            \Big|\Lambda_{5,j,r}(\Zb^{2,\ell-1})\Big| \leq {O}(\delta).
        \end{align*} 
        \item for \( j \notin \tau(\cY) \),  for any $r\in [m]$,  we have 
        \begin{align*}
            \Big|\Lambda_{5,j,r}(\Zb^{2,\ell-1})\Big|
            \leq \tilde{O}(\sigma_0).
        \end{align*} 
    \end{enumerate}
\end{lemma}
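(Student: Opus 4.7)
The plan is to expand $\Lambda_{5,j,r}(\Zb^{2,\ell-1})$ directly from its definition in \eqref{eq-def-Lambda-main} as an attention-weighted sum of the inner products $\langle \Wb_{5,j,r}, \Zb_\kk\rangle$ plus the fixed bias $b_{5,j,r}=\sigma_0\log d$, and then substitute the inner-product identities \eqref{eq-inner-product-1}--\eqref{eq-inner-product-3} already distilled from the end-of-stage-1 feature shape (\Cref{thm:mlp-simply-transitive}, packaged as \Cref{lem-prop-psi-cyc,lem-prop-irrelavant-cyc}). The only thing to track carefully is how the $\tilde{O}(\sigma_0)$ error terms aggregate across the (at most four) clauses in $\cI^{2,\ell-1}$ and how they interact with the bias.

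For case (a), I would split the sum $\sum_{\kk\in\cI^{2,\ell-1}}\attn_{\ans,\ell-1\to\kk}\langle\Wb_{5,j,r},\Zb_\kk\rangle$ into contributions from the $\ell'\in\{1,2\}$ predicate clauses and the $\ell'\in\{1,\dots,\ell\}$ answer clauses. By \eqref{eq-inner-product-1}, each predicate contributes $\attn_{\ans,\ell-1\to\pred,\ell'}(V_{j,r}(g_{\ell'})\pm\tilde{O}(\sigma_0))$; by \eqref{eq-inner-product-2}, each answer contributes $\attn_{\ans,\ell-1\to\ans,\ell'-1}(V_{j,r}(y_{\ell'-1})\pm\tilde{O}(\sigma_0))$. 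Since the attention coefficients lie in $[0,1]$ and sum to one, the accumulated error from the clauses is $\tilde{O}(\sigma_0)$, and the bias $\sigma_0\log d$ is also $\tilde{O}(\sigma_0)$, yielding the claimed identity.

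For case (b), the key input is \eqref{attn-init-prop-cyc-4}: whenever $r\notin\fA_j$, every feature $V_{j,r}(g)$ for $g\in\cG$ and every $V_{j,r}(y)$ for $y\in\cY$ has magnitude at most $O(\delta)$. Substituting this into the same expansion as in case (a) and using that the attention weights form a sub-probability vector gives $|\Lambda_{5,j,r}(\Zb^{2,\ell-1})|\le O(\delta)+\tilde{O}(\sigma_0)=O(\delta)$, since $\tilde{O}(\sigma_0)\ll\delta$. For case (c), \eqref{eq-inner-product-3} is already a clause-level bound $|\langle\Wb_{5,j,r},\Zb_\kk\rangle|\le\tilde{O}(\sigma_0)$ valid uniformly for every $\kk\in\cI^{2,\ell-1}$, so the attention-weighted sum inherits the bound $\tilde{O}(\sigma_0)$, and the bias is absorbed into the same order.

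I do not anticipate a real obstacle: this is a bookkeeping lemma that repackages the FFN feature-shape guarantees of Stage~1 into a form tailored for the upcoming attention-dynamics analysis. The only mildly delicate point is ensuring that the polylogarithmic factors hidden inside $\tilde{O}(\cdot)$ uniformly cover both the $\sigma_0\log d$ bias scale and the constant-many error contributions from the four clauses, which is automatic for the $L=2$ case treated here but will be worth revisiting when the lemma is later generalized to the $L$-clause setting used in the proof of \Cref{thm:length-generalization}, where the number of irrelevant clauses scales with~$L$.
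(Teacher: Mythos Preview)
Your proposal is correct and follows essentially the same approach as the paper: expand $\Lambda_{5,j,r}$ from its definition as an attention-weighted sum, then invoke \eqref{eq-inner-product-1}--\eqref{eq-inner-product-3} for part (a) and (c), and \eqref{attn-init-prop-cyc-4} for part (b), absorbing the bias $b_{5,j,r}=\sigma_0\log d$ into the $\tilde{O}(\sigma_0)$ term. The paper's proof is just a terser version of what you wrote, and your remark about error aggregation via the convexity of the attention weights makes explicit what the paper leaves implicit.
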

\begin{proof}
    Recall the definition of \(\Lambda_{5,j,r}(\Zb^{2,\ell-1})\) in \eqref{eq-def-Lambda-icl}, we have 
\begin{align*}
\Lambda_{5,j,r}(\Zb^{2,\ell-1})=\sum_{\mathbf{k} \in \mathcal{I}^{2, \ell-1}} \attn_{{\ans,\ell-1} \rightarrow \kk}\cdot\big\langle \Wb_{5,j, r}, \mathbf{Z}_{\mathbf{k}}\big\rangle +b_{5,j,r}.
\end{align*}
Thus, the first part follows directly from \eqref{eq-inner-product-1} and \eqref{eq-inner-product-2}; similarly, the second part holds by plugging \eqref{attn-init-prop-cyc-4} into \eqref{eq-inner-product-1} and \eqref{eq-inner-product-2}; the last part is a direct consequence of 
\eqref{eq-inner-product-3} and the fact $b_{i,j,r}=\sigma_0\log d$. 

\end{proof}
A direct consequence of the above lemma is the following finer characterization of the activated neurons.

\begin{lemma}\label{lem-non-activated-neuron}
    Given $j\in\tau(\cY)$, for $r\in \fA_{j}\setminus\hat{\fA}_{j}$, we have $\ReLU^{\prime}\big(\Lambda_{5, j,r}\big)=0$.
\end{lemma}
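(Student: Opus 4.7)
The plan is to unpack the set membership $r\in\fA_j\setminus\hat\fA_j$ and show that the convex combination defining $\Lambda_{5,j,r}$ is driven deep into the left flat region of $\srelu$. By the curriculum construction in the simply transitive case, each $r\in\fA_j$ corresponds to a unique pair $(g,y)\in\fF_j$ via $r=r_{g\cdot y}$. Membership in $\fA_j\setminus\hat\fA_j$ then forces $g\notin\hat\cG(\Zb^{L,\ell-1})=\{g_1,g_2\}$ and $y\notin\hat\cY=\{y_0,\ldots,y_{\ell-1}\}$, so every group token and every answer token that appears in the current context is "wrong" from the perspective of this neuron.

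The key step is to combine this mismatch with the cancellation identities \eqref{attn-init-prop-cyc-2} and \eqref{attn-init-prop-cyc-3} from \Cref{lem-prop-psi-cyc}. Since $V_{j,r_{g\cdot y}}(g), V_{j,r_{g\cdot y}}(y)=B-O(\delta)$ by \eqref{attn-init-prop-cyc-1}, I can rewrite
\begin{align*}
V_{j,r_{g\cdot y}}(g_{\ell'}) &= -V_{j,r_{g\cdot y}}(y)+O(\delta)\;\leq\; -B+O(\delta),\\
V_{j,r_{g\cdot y}}(y_{\ell'-1}) &= -V_{j,r_{g\cdot y}}(g)+O(\delta)\;\leq\; -B+O(\delta),
\end{align*}
for every summand that can appear in Lemma~\ref{lem-lambda-char}(a). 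The point is that \emph{every} token in the context clause yields a strongly negative feature, not just a few.

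Plugging these uniform upper bounds into Lemma~\ref{lem-lambda-char}(a) and using the fact that $\sum_{\ell'=1}^{2}\attn_{\ans,\ell-1\to\pred,\ell'}+\sum_{\ell'=1}^{\ell}\attn_{\ans,\ell-1\to\ans,\ell'-1}=1$ (these indices exhaust $\cI^{2,\ell-1}$), I obtain
\[
\Lambda_{5,j,r}(\Zb^{2,\ell-1})\;\leq\;-B+O(\delta)+\tilde O(\sigma_0).
\]
Since $B=\Theta(\log d)$ while $\delta, \tilde O(\sigma_0)=o(1)$, this is far below $-\varrho=\Theta(1/\polylog d)$, so $\Lambda_{5,j,r}$ lies in the left flat piece of $\srelu$ where the derivative vanishes identically by \Cref{def:smooth-relu}. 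This yields $\srelu'(\Lambda_{5,j,r})=0$ as claimed.

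The only subtlety I anticipate is bookkeeping: one must verify that $r=r_{g\cdot y}$ really does uniquely determine a single $(g,y)$ (so that the cancellation bounds apply with a single "correct" pair), and that the error $O(\delta)$ from \Cref{thm:mlp-simply-transitive} together with the $\tilde O(\sigma_0)$ slack from Lemma~\ref{lem-lambda-char}(a) do not creep above the $-\varrho$ threshold. Both follow immediately because $B\gg\varrho$, so I do not expect a genuine obstacle; the argument is essentially a one-line application of Lemmas~\ref{lem-prop-psi-cyc} and \ref{lem-lambda-char}.
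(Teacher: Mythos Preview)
Your proposal is correct and follows essentially the same approach as the paper: expand $\Lambda_{5,j,r}$ via \Cref{lem-lambda-char}(a), observe that for $r=r_{g\cdot y}\in\fA_j\setminus\hat\fA_j$ every $g_{\ell'}\neq g$ and every $y_{\ell'-1}\neq y$, so each term is $\leq -B+O(\delta)$, and conclude $\Lambda_{5,j,r}\ll -\varrho$. In fact, your citations of \eqref{attn-init-prop-cyc-2}--\eqref{attn-init-prop-cyc-3} are the right ones; the paper's proof cites \eqref{attn-init-prop-cyc-4} here, which appears to be a typo.
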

\begin{proof}
    For $j\in\tau(\Y)$, for $r\in \fA_{j}$, by \Cref{lem-lambda-char}, we have
    \begin{align*}
        \Lambda_{5,j,r}(\Zb^{2,\ell-1})=  \sum_{\ell'=1}^2 \attn_{\ans,\ell-1 \to \pred,\ell'}V_{j,r}(g_{\ell'})+  \sum_{\ell'=1}^{\ell} \attn_{\ans,\ell-1 \to \ans,\ell'-1} V_{j,r}(y_{\ell'-1})\pm  \tilde{O}(\sigma_0).
    \end{align*}
    By \eqref{attn-init-prop-cyc-4},  for for $r\in \fA_{j}\setminus\hat{\fA}_{j}$, we have $V_{j,r}(g_{\ell'}),V_{j,r}(y_{\ell'-1})\leq -B-O(\delta)$. Hence
    \begin{align*}
        \Lambda_{5,j,r}=  -\Omega(B)\pm  \tilde{O}(\sigma_0)\ll -\varrho,
    \end{align*}
    which implies $\ReLU^{\prime}\big(\Lambda_{5, j,r}\big)=0$.
\end{proof}

Now we are ready to further derive the gradients of the attention layer starting from \Cref{lem-gradients-Q43,lem-gradients-Q44} and the properties established above.
\begin{lemma}[Refined expression for the gradient of $\Qb_{4,3}$] \label{lem-refined-grad-Q43} Given $s\in\tau(\X)$, for the diagonal entry \( [\Q_{4,3}]_{s,s} \) of the block \(\Qb_{4,3}\), letting $j_1=\tau(g_1(y_0))$ and $j_2=\tau(g_2(y_1))$, we have 
      \begin{align*}
    &\Big[-\nabla_{\Q_{4,3}}\Loss^{2,1}_{5}\Big]_{s,s}= \E\Bigg[
    \attn_{{\ans,0} \rightarrow \pred,1} \cdot\\
    &~~~~~~~~~~~~~ \bigg( (1-\logit_{5,j_1})\cdot \Big(\sum_{r\in\hat{\fA}_{j_1}}\ReLU^{\prime}(\Lambda_{5,j_1, r
    })
    \cdot \Big( V_{j_1,  r}(g_1)- \Lambda_{5,j_1, r}\pm\tilde{O}(\sigma_0) \Big)\pm \tilde{O}(\delta^{q}) \Big)\\   
    &~~~~~~~~~~-\sum_{j\neq j_1\in\tau(\cY)}\logit_{5,j} \cdot \Big(\sum_{r\in\hat{\fA}_{j}}\ReLU^{\prime}(\Lambda_{5,j,r
    })\cdot  \Big( V_{j, r}(g_1)- \Lambda_{5,j,r}\pm\tilde{O}(\sigma_0) \Big)\pm \tilde{O}(\delta^{q}) \Big) \\
    &~~~~~~~~~~~~\pm\sum_{j\notin\tau(\cY)}\logit_{5,j}\tilde{O}(\sigma^{q}_0)  
    \bigg)\1_{\tau(x_0)=s}\Bigg];
\end{align*}
  \begin{align*}
    &\Big[-\nabla_{\Q_{4,3}}\Loss^{2,2}_{5}\Big]_{s,s}= 
      \E\Bigg[
    \attn_{{\ans,1} \rightarrow \pred,2} \cdot \\
    &~~~~~~~~~~~~~ \bigg( (1-\logit_{5,j_2})\cdot \Big(\sum_{r\in\hat{\fA}_{j_2}}\ReLU^{\prime}(\Lambda_{5,j_2, r
    })\cdot \Big( V_{j_2,  r}(g_2)- \Lambda_{5,j_2, r}\pm\tilde{O}(\sigma_0) \Big)\pm \tilde{O}(\delta^{q}) \Big)\\   
    &~~~~~~~~~~-\sum_{j\neq j_2\in\tau(\cY)}\logit_{5,j} \cdot \Big(\sum_{r\in\hat{\fA}_{j}}\ReLU^{\prime}(\Lambda_{5,j,r
    })\cdot  \Big( V_{j, r}(g_2)- \Lambda_{5,j,r}\pm\tilde{O}(\sigma_0) \Big)\pm \tilde{O}(\delta^{q}) \Big) \\
&~~~~~~~~~~~~\pm\sum_{j\notin\tau(\cY)}\logit_{5,j}\tilde{O}(\sigma^{q}_0)  \bigg)\1_{\tau(x_1)=s}\Bigg].
\end{align*}
Moreover, for the off-diagonal entries \( [\Q_{4,3}]_{s,s'} \) with \( s \neq s' \), we have
  \begin{align*}
    &\Big[-\nabla_{\Q_{4,3}}\Loss^{2,1}_{5}\Big]_{s,s'}= \E\Bigg[
    \attn_{{\ans,0} \rightarrow \pred,2} \cdot\\
    &~~~~~~~~~~~~~ \bigg( (1-\logit_{5,j_1})\cdot \Big(\sum_{r\in\hat{\fA}_{j_1}}\ReLU^{\prime}(\Lambda_{5,j_1, r
    })\cdot \Big( V_{j_1,  r}(g_2)- \Lambda_{5,j_1, r}\pm\tilde{O}(\sigma_0) \Big)\pm \tilde{O}(\delta^{q}) \Big)\\   
    &~~~~~~~~~~-\sum_{j\neq j_1\in\tau(\cY)}\logit_{5,j} \cdot \Big(\sum_{r\in\hat{\fA}_{j}}\ReLU^{\prime}(\Lambda_{5,j,r
    })\cdot  \Big( V_{j, r}(g_2)- \Lambda_{5,j,r}\pm\tilde{O}(\sigma_0) \Big)\pm \tilde{O}(\delta^{q}) \Big) \\
    &~~~~~~~~~~~~\pm\sum_{j\notin\tau(\cY)}\logit_{5,j}\tilde{O}(\sigma^{q}_0)  \bigg)\1_{\tau(x_0)=s,\tau(x_1)=s'}\Bigg];
\end{align*}
  \begin{align*}
    &\Big[-\nabla_{\Q_{4,3}}\Loss^{2,2}_{5}\Big]_{s,s'}=  \E\Bigg[
    \attn_{{\ans,1} \rightarrow \pred,1} \cdot\\
    &~~~~~~~~~~~~~ \bigg( (1-\logit_{5,j_2})\cdot \Big(\sum_{r\in\hat{\fA}_{j_2}}\ReLU^{\prime}(\Lambda_{5,j_2, r
    })\cdot \Big( V_{j_2,  r}(g_1)- \Lambda_{5,j_2, r}\pm\tilde{O}(\sigma_0) \Big)\pm \tilde{O}(\delta^{q}) \Big)\\   
    &~~~~~~~~~~-\sum_{j\neq j_2\in\tau(\cY)}\logit_{5,j} \cdot \Big(\sum_{r\in\hat{\fA}_{j}}\ReLU^{\prime}(\Lambda_{5,j,r
    })\cdot  \Big( V_{j, r}(g_1)- \Lambda_{5,j,r}\pm\tilde{O}(\sigma_0) \Big)\pm \tilde{O}(\delta^{q}) \Big) \\
    &~~~~~~~~~~~~\pm\sum_{j\notin\tau(\cY)}\logit_{5,j}\tilde{O}(\sigma^{q}_0)  \bigg)\1_{\tau(x_1)=s,\tau(x_0)=s'}\Bigg].
\end{align*}
\end{lemma}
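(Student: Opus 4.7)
The plan is a direct computation starting from the raw gradient expressions in Lemma~\ref{lem-gradients-Q43} and progressively replacing each piece with the simplified form dictated by the MLP feature structure (Lemmas~\ref{lem-prop-psi-cyc}, \ref{lem-prop-irrelavant-cyc}, \ref{lem-lambda-char}, \ref{lem-non-activated-neuron}). I will give the argument for $[-\nabla_{\Q_{4,3}}\Loss^{2,1}_{5}]_{s,s}$ in detail; the other three identities follow by exactly the same recipe, with only the query/key clause and the identity of the ``correct class'' $j_1$ or $j_2$ changing.

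\textbf{Step 1 (loss derivative split).} Inside the indicator $\1_{\tau(x_0)=s}$, the ground-truth fifth token of $Z_{\ans,1}$ equals $y_1=g_1(y_0)$, hence $\1_{Z_{\ans,1,5}=e_j}=\1_{j=j_1}$ with $j_1=\tau(g_1(y_0))$. Substituting $\Ecal_{5,j}=\1_{j=j_1}-\logit_{5,j}$ into Lemma~\ref{lem-gradients-Q43} partitions the sum over $j\in[d]$ into three buckets: the singleton $\{j_1\}$ (contributing the factor $1-\logit_{5,j_1}$), the finite set $\tau(\cY)\setminus\{j_1\}$ (contributing $-\logit_{5,j}$), and the remainder $[d]\setminus\tau(\cY)$ (also $-\logit_{5,j}$).

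\textbf{Step 2 (neuron restriction for $j\in\tau(\cY)$).} Fix $j\in\tau(\cY)$. I split the inner sum $\sum_{r\in[m]}$ into $r\in\hat{\fA}_j$, $r\in\fA_j\setminus\hat{\fA}_j$, and $r\in[m]\setminus\fA_j$. Lemma~\ref{lem-non-activated-neuron} zeroes out the middle piece since $\ReLU'(\Lambda_{5,j,r})=0$ there. For $r\notin\fA_j$, Lemma~\ref{lem-lambda-char}(b) gives $|\Lambda_{5,j,r}|\leq O(\delta)$, hence $\ReLU'(\Lambda_{5,j,r})\leq O(\delta^{q-1})$; combined with Lemma~\ref{lem-prop-irrelavant-cyc} and the $\tilde O(\sigma_0)$ bound on the residual inner products, the factor $\dbrack{\Wb_{5,j,r},\Z_{\pred,1}}-\Lambda_{5,j,r}+b_{5,j,r}$ is $O(\delta)$, and summing over the at most $m=\polylog d$ such neurons yields an aggregated error of order $\tilde O(\delta^{q})$, exactly the $\pm\tilde O(\delta^q)$ slack appearing in the statement.

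\textbf{Step 3 (inner product normalization on $\hat{\fA}_j$).} For the surviving indices $r\in\hat{\fA}_j$ with $j\in\tau(\cY)$, I invoke the computation \eqref{eq-inner-product-1}, which gives $\dbrack{\Wb_{5,j,r},\Z_{\pred,1}}=V_{j,r}(g_1)\pm\tilde O(\sigma_0)$; the bias $b_{5,j,r}=\sigma_0\log d$ is absorbed into the same $\tilde O(\sigma_0)$ slack. This yields the factor $V_{j,r}(g_1)-\Lambda_{5,j,r}\pm\tilde O(\sigma_0)$ inside each activated term, matching the claimed form.

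\textbf{Step 4 ($j\notin\tau(\cY)$ bucket).} Here Lemma~\ref{lem-lambda-char}(c) says $|\Lambda_{5,j,r}|\leq\tilde O(\sigma_0)$ for \emph{every} $r$, so $\ReLU'(\Lambda_{5,j,r})\leq\tilde O(\sigma_0^{q-1})$; Lemma~\ref{lem-prop-irrelavant-cyc} together with \eqref{eq-inner-product-3} also bounds the factor $\dbrack{\Wb_{5,j,r},\Z_{\pred,1}}-\Lambda_{5,j,r}+b_{5,j,r}$ by $\tilde O(\sigma_0)$. Summing over $r\in[m]$ collapses this bucket to a single $\pm\tilde O(\sigma_0^q)$ term, weighted by $\logit_{5,j}$.

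\textbf{Step 5 (three sibling identities).} The diagonal expression for $\Loss^{2,2}_5$ is identical up to relabeling $(\pred,1)\mapsto(\pred,2)$, $(\ans,0)\mapsto(\ans,1)$, $g_1\mapsto g_2$, $j_1\mapsto j_2$, and the indicator $\1_{\tau(x_0)=s}\mapsto\1_{\tau(x_1)=s}$; Step 1 uses that the next ground-truth token is $y_2=g_2(y_1)$. The off-diagonal entries use the same Steps 1–4 but with the query and the attended-to clause swapped, which is exactly why both off-diagonal identities carry a double indicator $\1_{\tau(x_0)=s,\tau(x_1)=s'}$ and the inner product acquires $V_{j,r}(g_2)$ (for $\Loss^{2,1}_5$) or $V_{j,r}(g_1)$ (for $\Loss^{2,2}_5$).

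The main bookkeeping obstacle will be Step 2: one must verify that the aggregated contribution from the ``inactive but not suppressed'' neurons ($r\in[m]\setminus\fA_j$) is genuinely $\tilde O(\delta^q)$ rather than only $\tilde O(m\delta^q)$ in a way that degrades the downstream analysis, and one must also be careful that $\Lambda_{5,j,r}$ sitting inside $V_{j,r}(g_1)-\Lambda_{5,j,r}$ is not double-counted against the $\tilde O(\sigma_0)$ slack (it is kept explicit precisely because later use of this lemma exploits its sign structure). No other step involves more than elementary substitution of the earlier MLP lemmas.
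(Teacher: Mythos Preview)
Your proposal is correct and follows essentially the same approach as the paper's proof: start from Lemma~\ref{lem-gradients-Q43}, split the $j$-sum into $\{j_1\}$, $\tau(\cY)\setminus\{j_1\}$, and $[d]\setminus\tau(\cY)$ via the $\Ecal_{5,j}$ decomposition, substitute $\dbrack{\Wb_{5,j,r},\Z_{\pred,1}}=V_{j,r}(g_1)\pm\tilde O(\sigma_0)$, and then invoke Lemmas~\ref{lem-lambda-char} and~\ref{lem-non-activated-neuron} to collapse the neuron sum to $\hat\fA_j$ with the stated error terms. Your Step~2 bookkeeping (distinguishing $\fA_j\setminus\hat\fA_j$ from $[m]\setminus\fA_j$) is more explicit than the paper's terser version, but the content is identical.
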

\begin{proof}
    For $\ell=1$, 
   \begin{itemize}[left=0pt]
        \item for the diagonal entry \( [\Q_{4,3}]_{s,s} \) with \( s \in \tau(\X) \),
  \begin{align*}
    &\Big[-\nabla_{\Q_{4,3}}\Loss^{2,1}_{5}\Big]_{s,s}= \E\Bigg[
    \attn_{{\ans,0} \rightarrow \pred,1} \cdot \bigg(\sum_{j \in[d]} \Ecal_{5,j}(\Zb^{2,0})\sum_{r\in [m]}\ReLU^{\prime}\big(\Lambda_{5, j,r}\big)\cdot  \\
    &~~~~~~~~~~~~~~~~~~~~~~~~~~~~\Big( \dbrack{\Wb_{5,j,r},\Z_{\pred,1}}- \Lambda_{5, j,r}+b_{5,j,r}\Big)\bigg) \1_{s=\tau(x_0)}\Bigg]\\
     &= \E\Bigg[
    \attn_{{\ans,0} \rightarrow \pred,1} \cdot \bigg( (1-\logit_{5,j_1})\cdot \sum_{r\in[m]}\ReLU^{\prime}(\Lambda_{5,j_1, r
    }) \Big( V_{j_1, r}(g_1)- \Lambda_{5,j_1,r}\pm\tilde{O}(\sigma_0) \Big)\\   
    &~~~~~~~~~~~~-\sum_{j\neq j_1\in\tau(\cY)}\logit_{5,j} \cdot \sum_{r\in[m]}\ReLU^{\prime}(\Lambda_{5,j, r
    }) \Big( V_{j, r}(g_1)- \Lambda_{5,j,r}\pm\tilde{O}(\sigma_0) \Big)\\
    &~~~~~~~~~~~~-\sum_{j\notin\tau(\cY)}\logit_{5,j} \cdot \sum_{r\in[m]}\ReLU^{\prime}(\Lambda_{5,j, r
    }) \Big( \langle\Wb_{5,j,r},\Zb_{\pred,1}\rangle- \Lambda_{5,j,r}\pm\tilde{O}(\sigma_0) \Big)
    \bigg)\1_{\tau(x_0)=s}\Bigg]\\
      &= \E\Bigg[
    \attn_{{\ans,0} \rightarrow \pred,1} \cdot\\
    &~~~~~~~~~~~~~ \bigg( (1-\logit_{5,j_1})\cdot \Big(\sum_{r\in\hat{\fA}_{j_1}}\ReLU^{\prime}(\Lambda_{5,j_1, r
    })\cdot \Big( V_{j_1,  r}(g_1)- \Lambda_{5,j_1, r}\pm\tilde{O}(\sigma_0) \Big)\pm \tilde{O}(\delta^{q}) \Big)\\   
    &~~~~~~~~~~-\sum_{j\neq j_1\in\tau(\cY)}\logit_{5,j} \cdot \Big(\sum_{r\in\hat{\fA}_{j}}\ReLU^{\prime}(\Lambda_{5,j,r
    })\cdot  \Big( V_{j, r}(g_1)- \Lambda_{5,j,r}\pm\tilde{O}(\sigma_0) \Big)\pm \tilde{O}(\delta^{q}) \Big) \\
    &~~~~~~~~~~~~\pm\sum_{j\notin\tau(\cY)}\logit_{5,j}\tilde{O}(\sigma^{q}_0)  \bigg)\1_{\tau(x_0)=s}\Bigg],
\end{align*}
where the last equality follows from \Cref{lem-lambda-char} and \Cref{lem-non-activated-neuron}.
   \item for the non-diagonal entry \( [\Q_{4,3}]_{s,s'} \) with \( s\not=s' \in \tau(\X) \), the analysis is similar unless the condition that the gradient is non-zero only when \( s = \tau(x_0) \) and \( s' = \tau(x_1) \). Thus, we have
  \begin{align*}
    &\Big[-\nabla_{\Q_{4,3}}\Loss^{2,1}_{5}\Big]_{s,s'}= \E\Bigg[
    \attn_{{\ans,0} \rightarrow \pred,2} \cdot \bigg(\sum_{j \in[d]} \Ecal_{5,j}(\Zb^{2,0})\sum_{r\in [m]}\ReLU^{\prime}\big(\Lambda_{5, j,r}\big)\cdot  \\
    &~~~~~~~~~~~~~~~~~~~~~~~~~~~~\Big( \dbrack{\Wb_{5,j,r},\Z_{\pred,2}}- \Lambda_{5, j,r}+b_{5,j,r}\Big)\bigg) \1_{s=\tau(x_0), s'=\tau(x_1)}\Bigg]\\
      &= \E\Bigg[
    \attn_{{\ans,0} \rightarrow \pred,2} \cdot\\
    &~~~~~~~~~~~~~ \bigg( (1-\logit_{5,j_1})\cdot \Big(\sum_{r\in\hat{\fA}_{j_1}}\ReLU^{\prime}(\Lambda_{5,j_1, r
    })\cdot \Big( V_{j_1,  r}(g_2)- \Lambda_{5,j_1, r}\pm\tilde{O}(\sigma_0) \Big)\pm \tilde{O}(\delta^{q}) \Big)\\   
    &~~~~~~~~~~-\sum_{j\neq j_1\in\tau(\cY)}\logit_{5,j} \cdot \Big(\sum_{r\in\hat{\fA}_{j}}\ReLU^{\prime}(\Lambda_{5,j,r
    })\cdot  \Big( V_{j, r}(g_2)- \Lambda_{5,j,r}\pm\tilde{O}(\sigma_0) \Big)\pm \tilde{O}(\delta^{q}) \Big) \\
    &~~~~~~~~~~~~\pm\sum_{j\notin\tau(\cY)}\logit_{5,j}\tilde{O}(\sigma^{q}_0)  \bigg)\1_{\tau(x_0)=s,\tau(x_1)=s'}\Bigg].
\end{align*}
\end{itemize}
 For $\ell=2$, 
   \begin{itemize}[left=0pt]
        \item for the diagonal entry \( [\Q_{4,3}]_{s,s} \) with \( s \in \tau(\X) \),
  \begin{align*}
    &\Big[-\nabla_{\Q_{4,3}}\Loss^{2,2}_{5}\Big]_{s,s}= \E\Bigg[
    \attn_{{\ans,1} \rightarrow \pred,2} \cdot \bigg(\sum_{j \in[d]} \Ecal_{5,j}(\Zb^{2,1})\sum_{r\in [m]}\ReLU^{\prime}\big(\Lambda_{5, j,r}\big)\cdot  \\
    &~~~~~~~~~~~~~~~~~~~~~~~~~~~~\Big( \dbrack{\Wb_{5,j,r},\Z_{\pred,2}}- \Lambda_{5, j,r}+b_{5,j,r}\Big)\bigg) \1_{s=\tau(x_1)}\Bigg]\\
     &= \E\Bigg[
    \attn_{{\ans,1} \rightarrow \pred,2} \cdot \bigg( (1-\logit_{5,j_2})\cdot \sum_{r\in[m]}\ReLU^{\prime}(\Lambda_{5,j_2, r
    }) \Big( V_{j_2, r}(g_2)- \Lambda_{5,j_2,r}\pm\tilde{O}(\sigma_0) \Big)\\   
    &~~~~~~~~~~~~-\sum_{j\neq j_2\in\tau(\cY)}\logit_{5,j} \cdot \sum_{r\in[m]}\ReLU^{\prime}(\Lambda_{5,j, r
    }) \Big( V_{j, r}(g_2)- \Lambda_{5,j,r}\pm\tilde{O}(\sigma_0) \Big)\\
    &~~~~~~~~~~~~-\sum_{j\notin\tau(\cY)}\logit_{5,j} \cdot \sum_{r\in[m]}\ReLU^{\prime}(\Lambda_{5,j, r
    }) \Big( \langle\Wb_{5,j,r},\Zb_{\pred,2}\rangle- \Lambda_{5,j,r}\pm\tilde{O}(\sigma_0) \Big)
    \bigg)\1_{\tau(x_1)=s}\Bigg]\\
      &= \E\Bigg[
    \attn_{{\ans,1} \rightarrow \pred,2} \cdot\\
    &~~~~~~~~~~~~~ \bigg( (1-\logit_{5,j_2})\cdot \Big(\sum_{r\in\hat{\fA}_{j_2}}\ReLU^{\prime}(\Lambda_{5,j_2, r
    })\cdot \Big( V_{j_2,  r}(g_2)- \Lambda_{5,j_2, r}\pm\tilde{O}(\sigma_0) \Big)\pm \tilde{O}(\delta^{q}) \Big)\\   
    &~~~~~~~~~~-\sum_{j\neq j_2\in\tau(\cY)}\logit_{5,j} \cdot \Big(\sum_{r\in\hat{\fA}_{j}}\ReLU^{\prime}(\Lambda_{5,j,r
    })\cdot  \Big( V_{j, r}(g_2)- \Lambda_{5,j,r}\pm\tilde{O}(\sigma_0) \Big)\pm \tilde{O}(\delta^{q}) \Big) \\
    &~~~~~~~~~~~~\pm\sum_{j\notin\tau(\cY)}\logit_{5,j}\tilde{O}(\sigma^{q}_0)  \bigg)\1_{\tau(x_1)=s}\Bigg],
\end{align*}
where the last equality follows from \Cref{lem-lambda-char} and \Cref{lem-non-activated-neuron}.
   \item for the non-diagonal entry \( [\Q_{4,3}]_{s,s'} \) with \( s\not=s' \in \tau(\X) \), the analysis is similar unless the condition that the gradient is non-zero only when \( s = \tau(x_0) \) and \( s' = \tau(x_1) \). Thus, we have
  \begin{align*}
    &\Big[-\nabla_{\Q_{4,3}}\Loss^{2,2}_{5}\Big]_{s,s'}= \E\Bigg[
    \attn_{{\ans,1} \rightarrow \pred,1} \cdot \bigg(\sum_{j \in[d]} \Ecal_{5,j}(\Zb^{2,1})\sum_{r\in [m]}\ReLU^{\prime}\big(\Lambda_{5, j,r}\big)\cdot  \\
    &~~~~~~~~~~~~~~~~~~~~~~~~~~~~\Big( \dbrack{\Wb_{5,j,r},\Z_{\pred,1}}- \Lambda_{5, j,r}+b_{5,j,r}\Big)\bigg) \1_{s=\tau(x_1), s'=\tau(x_0)}\Bigg]\\
      &= \E\Bigg[
    \attn_{{\ans,1} \rightarrow \pred,1} \cdot\\
    &~~~~~~~~~~~~~ \bigg( (1-\logit_{5,j_2})\cdot \Big(\sum_{r\in\hat{\fA}_{j_2}}\ReLU^{\prime}(\Lambda_{5,j_2, r
    })\cdot \Big( V_{j_2,  r}(g_1)- \Lambda_{5,j_2, r}\pm\tilde{O}(\sigma_0) \Big)\pm \tilde{O}(\delta^{q}) \Big)\\   
    &~~~~~~~~~~-\sum_{j\neq j_2\in\tau(\cY)}\logit_{5,j} \cdot \Big(\sum_{r\in\hat{\fA}_{j}}\ReLU^{\prime}(\Lambda_{5,j,r
    })\cdot  \Big( V_{j, r}(g_1)- \Lambda_{5,j,r}\pm\tilde{O}(\sigma_0) \Big)\pm \tilde{O}(\delta^{q}) \Big) \\
    &~~~~~~~~~~~~\pm\sum_{j\notin\tau(\cY)}\logit_{5,j}\tilde{O}(\sigma^{q}_0)  \bigg)\1_{\tau(x_1)=s,\tau(x_0)=s'}\Bigg].
\end{align*}
\end{itemize}
Therefore, we complete the proof.
\end{proof}
\begin{lemma}[Refined expression for the gradient of $\Qb_{4,4}$] \label{lem-refined-grad-Q44} Given $s\in\tau(\X)$, for the diagonal entry \( [\Q_{4,4}]_{s,s} \) of the block \(\Qb_{4,4}\), letting $j_1=\tau(g_1(y_0))$ and $j_2=\tau(g_2(y_1))$, we have 
      \begin{align*}
    &\Big[-\nabla_{\Q_{4,4}}\Loss^{2,1}_{5}\Big]_{s,s}= \E\Bigg[
    \attn_{{\ans,0} \rightarrow \ans,0} \cdot\\
    &~~~~~~~~~~~~~ \bigg( (1-\logit_{5,j_1})\cdot \Big(\sum_{r\in\hat{\fA}_{j_1}}\ReLU^{\prime}(\Lambda_{5,j_1, r
    })
    \cdot \Big( V_{j_1,  r}(y_0)- \Lambda_{5,j_1, r}\pm\tilde{O}(\sigma_0) \Big)\pm \tilde{O}(\delta^{q}) \Big)\\   
    &~~~~~~~~~~-\sum_{j\neq j_1\in\tau(\cY)}\logit_{5,j} \cdot \Big(\sum_{r\in\hat{\fA}_{j}}\ReLU^{\prime}(\Lambda_{5,j,r
    })\cdot  \Big( V_{j, r}(y_0)- \Lambda_{5,j,r}\pm\tilde{O}(\sigma_0) \Big)\pm \tilde{O}(\delta^{q}) \Big) \\
    &~~~~~~~~~~~~\pm\sum_{j\notin\tau(\cY)}\logit_{5,j}\tilde{O}(\sigma^{q}_0)  
    \bigg)\1_{\tau(x_0)=s}\Bigg].
\end{align*}
  \begin{align*}
    &\Big[-\nabla_{\Q_{4,4}}\Loss^{2,2}_{5}\Big]_{s,s}= 
      \E\Bigg[
    \attn_{{\ans,1} \rightarrow \ans,1} \cdot \\
    &~~~~~~~~~~~~~ \bigg( (1-\logit_{5,j_2})\cdot \Big(\sum_{r\in\hat{\fA}_{j_2}}\ReLU^{\prime}(\Lambda_{5,j_2, r
    })\cdot \Big( V_{j_2,  r}(y_1)- \Lambda_{5,j_2, r}\pm\tilde{O}(\sigma_0) \Big)\pm \tilde{O}(\delta^{q}) \Big)\\   
    &~~~~~~~~~~-\sum_{j\neq j_2\in\tau(\cY)}\logit_{5,j} \cdot \Big(\sum_{r\in\hat{\fA}_{j}}\ReLU^{\prime}(\Lambda_{5,j,r
    })\cdot  \Big( V_{j, r}(y_1)- \Lambda_{5,j,r}\pm\tilde{O}(\sigma_0) \Big)\pm \tilde{O}(\delta^{q}) \Big) \\
    &~~~~~~~~~~~~\pm\sum_{j\notin\tau(\cY)}\logit_{5,j}\tilde{O}(\sigma^{q}_0)  \bigg)\1_{\tau(x_1)=s}\Bigg].
\end{align*}
Moreover, for the off-diagonal entries \( [\Q_{4,4}]_{s,s'} \) with \( s \neq s' \), we have $[-\nabla_{\Q_{4,4}}\Loss^{2,1}_{5}]_{s,s'}=0$, and
  \begin{align*}
    &\Big[-\nabla_{\Q_{4,4}}\Loss^{2,2}_{5}\Big]_{s,s'}=  \E\Bigg[
    \attn_{{\ans,1} \rightarrow \ans,0} \cdot\\
    &~~~~~~~~~~~~~ \bigg( (1-\logit_{5,j_2})\cdot \Big(\sum_{r\in\hat{\fA}_{j_2}}\ReLU^{\prime}(\Lambda_{5,j_2, r
    })\cdot \Big( V_{j_2,  r}(y_0)- \Lambda_{5,j_2, r}\pm\tilde{O}(\sigma_0) \Big)\pm \tilde{O}(\delta^{q}) \Big)\\   
    &~~~~~~~~~~-\sum_{j\neq j_2\in\tau(\cY)}\logit_{5,j} \cdot \Big(\sum_{r\in\hat{\fA}_{j}}\ReLU^{\prime}(\Lambda_{5,j,r
    })\cdot  \Big( V_{j, r}(y_0)- \Lambda_{5,j,r}\pm\tilde{O}(\sigma_0) \Big)\pm \tilde{O}(\delta^{q}) \Big) \\
    &~~~~~~~~~~~~\pm\sum_{j\notin\tau(\cY)}\logit_{5,j}\tilde{O}(\sigma^{q}_0)  \bigg)\1_{\tau(x_1)=s,\tau(x_0)=s'}\Bigg].
\end{align*}
\end{lemma}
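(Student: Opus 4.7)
The plan is to follow the same pipeline used in the proof of Lemma~\ref{lem-refined-grad-Q43}, replacing the predicate-clause inner products by the answer-clause inner products and tracking how the block structure of $\Qb_{4,4}$ restricts which indicators survive.

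I would start from the raw gradient expressions in Lemma~\ref{lem-gradients-Q44} and decompose the sum $\sum_{j\in[d]}\Ecal_{5,j}(\Zb^{2,\ell-1})(\cdots)$ into three disjoint regimes: $j=j_\ell$ (with $\Ecal_{5,j_\ell}=1-\logit_{5,j_\ell}$), $j\in\tau(\cY)\setminus\{j_\ell\}$ (with $\Ecal_{5,j}=-\logit_{5,j}$), and $j\notin\tau(\cY)$. For each fixed $j$, I would further split the sum over $r\in[m]$ into activated and non-activated neurons. By Lemma~\ref{lem-non-activated-neuron}, any $r\in\fA_j\setminus\hat\fA_j$ contributes $\mathbf{sReLU}'(\Lambda_{5,j,r})=0$, while by Lemma~\ref{lem-prop-psi-cyc}\eqref{attn-init-prop-cyc-4} together with Lemma~\ref{lem-lambda-char}(b), any $r\notin\fA_j$ satisfies $|\Lambda_{5,j,r}|\le O(\delta)$, so $\mathbf{sReLU}'(\Lambda_{5,j,r})=O(\delta^{q-1})$ and the associated bracket is bounded by $O(\delta)$, giving an aggregate error $\pm\tilde O(\delta^{q})$ per $j\in\tau(\cY)$. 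For $j\notin\tau(\cY)$, Lemma~\ref{lem-lambda-char}(c) and Lemma~\ref{lem-prop-irrelavant-cyc} together give $|\Lambda_{5,j,r}|\le\tilde O(\sigma_0)$ and $|\langle\Wb_{5,j,r},\Zb_{\ans,\ell'-1}\rangle|\le\tilde O(\sigma_0)$, which collapses the full contribution of such $j$ into the uniform error term $\pm\sum_{j\notin\tau(\cY)}\logit_{5,j}\tilde O(\sigma_0^{q})$.

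The key substitution, specific to the $\Qb_{4,4}$ block, is the identity $\langle\Wb_{5,j,r},\Zb_{\ans,\ell'-1}\rangle = V_{j,r}(y_{\ell'-1})\pm\tilde O(\sigma_0)$ from \eqref{eq-inner-product-2}, which replaces \eqref{eq-inner-product-1} used in the $\Qb_{4,3}$ case. Plugging this into the bracket $\langle\Wb_{5,j,r},\Zb_{\ans,\ell'-1}\rangle-\Lambda_{5,j,r}+b_{5,j,r}$ yields the form $V_{j,r}(y_{\ell'-1})-\Lambda_{5,j,r}\pm\tilde O(\sigma_0)$ stated in the lemma. The diagonal expression for $[-\nabla_{\Q_{4,4}}\Loss^{2,\ell}_{5}]_{s,s}$ then follows by combining these pieces and using the indicator $\1_{s=\tau(x_{\ell-1})}$ that comes from the outer product $\Z_{\ans,\ell-1,4}\Z_{\ans,\ell-1,4}^\top=e_{\tau(x_{\ell-1})}e_{\tau(x_{\ell-1})}^\top$ inherent in the block $(4,4)$ position of Fact~\ref{fact:gradients-Q}.

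For the off-diagonal entries I would argue separately for $\ell=1$ and $\ell=2$. When $\ell=1$, the only admissible key clause of answer type is $\Zb_{\ans,0}$, so the cross term $\Z_{\ans,0,4}\Z_{\ans,0,4}^\top$ produces only $(s,s)=(\tau(x_0),\tau(x_0))$ contributions and thus $[-\nabla_{\Q_{4,4}}\Loss^{2,1}_{5}]_{s,s'}=0$ for all $s\neq s'$; this is essentially a bookkeeping check and not a real obstruction. When $\ell=2$, the attention at the answer query can route to $\Zb_{\ans,0}$, and the corresponding outer product is $e_{\tau(x_1)}e_{\tau(x_0)}^\top$, producing the indicator $\1_{\tau(x_1)=s,\tau(x_0)=s'}$ and the answer token $y_0$ in place of $y_1$; the rest of the decomposition proceeds exactly as in the diagonal case. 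The main technical nuisance, identical to the $\Qb_{4,3}$ proof, is the accounting of the non-activated and out-of-$\tau(\cY)$ classes so that their aggregated errors fit into the $\pm\tilde O(\delta^q)$ and $\pm\sum_{j\notin\tau(\cY)}\logit_{5,j}\tilde O(\sigma_0^q)$ slots; once Lemmas~\ref{lem-lambda-char}, \ref{lem-non-activated-neuron}, \ref{lem-prop-psi-cyc}, and \ref{lem-prop-irrelavant-cyc} are in hand, this is a mechanical but careful substitution exercise.
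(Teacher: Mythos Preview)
Your proposal is correct and follows essentially the same approach as the paper, which simply states that the proof is similar to Lemma~\ref{lem-refined-grad-Q43} and omits it. Your pipeline---starting from Lemma~\ref{lem-gradients-Q44}, splitting over $j$ into the three regimes, restricting the $r$-sum to $\hat\fA_j$ via Lemmas~\ref{lem-lambda-char} and~\ref{lem-non-activated-neuron}, and substituting \eqref{eq-inner-product-2} in place of \eqref{eq-inner-product-1}---is exactly the intended argument, and your handling of the off-diagonal cases (including why $\ell=1$ produces no off-diagonal term) is accurate.
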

The proof is similar to \Cref{lem-refined-grad-Q43} and we omit it here.

\paragraph{Notations for gradient decompositions.} We shall define some useful notations to further simplify the expressions of the gradient.
  \begin{itemize}[left=0pt]
    \item for $\ell=1$,  
    \begin{itemize}[left=0pt]
        \item for $[\Q_{4,3}]_{s,s}$ with $s\in\tau(\X)$, we have $[-\nabla_{\Qb_{4,3}}\Loss_5^{2,1}]_{s,s}=\cN_{s,3,1,\rom1}+\cN_{s,3,1,\rom2}+\cN_{s,3,1,\rom3}$, where 
      \begin{align}\label{eq-def-N-s-3-1-1}
       \cN_{s,3,1,\rom1}&= \E\Bigg[
    \attn_{{\ans,0} \rightarrow \pred,1} \cdot (1-\logit_{5,j_1})\cdot \\
    &~~~~~~~~~  \Big(\sum_{r\in\hat{\fA}_{j_1}}\ReLU^{\prime}(\Lambda_{5,j_1, r
    })
    \cdot \Big( V_{j_1,  r}(g_1)- \Lambda_{5,j_1, r}\pm\tilde{O}(\sigma_0) \Big)\pm \tilde{O}(\delta^{q}) \Big) 
    \1_{\tau(x_0)=s}\Bigg],\notag
\end{align}
      \begin{align}\label{eq-def-N-s-3-1-2}
       \cN_{s,3,1,\rom2}&= -\E\Bigg[
    \attn_{{\ans,0} \rightarrow \pred,1} \cdot\sum_{j\neq j_1\in\tau(\cY)}\logit_{5,j} \cdot \\
    &~~~~~~~~~~\Big(\sum_{r\in\hat{\fA}_{j}}\ReLU^{\prime}(\Lambda_{5,j,r
    })\cdot  \Big( V_{j, r}(g_1)- \Lambda_{5,j,r}\pm\tilde{O}(\sigma_0) \Big)\pm \tilde{O}(\delta^{q}) \Big) 
\1_{\tau(x_0)=s}\Bigg],\notag
\end{align}
      \begin{align}\label{eq-def-N-s-3-1-3}
       \cN_{s,3,1,\rom3}&= \pm\E\Bigg[
    \attn_{{\ans,0} \rightarrow \pred,1} \cdot\sum_{j\notin\tau(\cY)}\logit_{5,j}\tilde{O}(\sigma^{q}_0)\1_{\tau(x_0)=s}\Bigg].~~~~~~~~~~~~~~~~~~~~~~~~~~~~~~
\end{align}
\item for $[\Q_{4,4}]_{s,s}$ with $s\in\tau(\X)$, we have $[-\nabla_{\Qb_{4,4}}\Loss_5^{2,1}]_{s,s}=\cN_{s,4,1,\rom1}+\cN_{s,4,1,\rom2}+\cN_{s,4,1,\rom3}$, where 
      \begin{align}\label{eq-def-N-s-4-1-1}
       \cN_{s,4,1,\rom1}&= \E\Bigg[
    \attn_{{\ans,0} \rightarrow \ans,0} \cdot (1-\logit_{5,j_1})\cdot \\
    &~~~~~~~~~  \Big(\sum_{r\in\hat{\fA}_{j_1}}\ReLU^{\prime}(\Lambda_{5,j_1, r
    })
    \cdot \Big( V_{j_1,  r}(y_0)- \Lambda_{5,j_1, r}\pm\tilde{O}(\sigma_0) \Big)\pm \tilde{O}(\delta^{q}) \Big) 
    \1_{\tau(x_0)=s}\Bigg],\notag
\end{align}
      \begin{align}\label{eq-def-N-s-4-1-2}
       \cN_{s,4,1,\rom2}&= -\E\Bigg[
    \attn_{{\ans,0} \rightarrow \ans,0} \cdot\sum_{j\neq j_1\in\tau(\cY)}\logit_{5,j} \cdot \\
    &~~~~~~~~~~\Big(\sum_{r\in\hat{\fA}_{j}}\ReLU^{\prime}(\Lambda_{5,j,r
    })\cdot  \Big( V_{j, r}(y_0)- \Lambda_{5,j,r}\pm\tilde{O}(\sigma_0) \Big)\pm \tilde{O}(\delta^{q}) \Big) 
\1_{\tau(x_0)=s}\Bigg],\notag
\end{align}
      \begin{align}\label{eq-def-N-s-4-1-3}
       \cN_{s,4,1,\rom3}&= \pm\E\Bigg[
    \attn_{{\ans,0} \rightarrow \ans,0} \cdot\sum_{j\notin\tau(\cY)}\logit_{5,j}\tilde{O}(\sigma^{q}_0)\1_{\tau(x_0)=s}\Bigg].~~~~~~~~~~~~~~~~~~~~~~~~~~~~~~
\end{align}
    \end{itemize}
    \item for $\ell=2$,  
    \begin{itemize}[left=0pt]
        \item for $[\Q_{4,3}]_{s,s}$ with $s\in\tau(\X)$, we have $[-\nabla_{\Qb_{4,3}}\Loss_5^{2,2}]_{s,s}=\cN_{s,3,2,\rom1}+\cN_{s,3,2,\rom2}+\cN_{s,3,2,\rom3}$, where 
      \begin{align}
       \cN_{s,3,2,\rom1}&= \E\Bigg[
    \attn_{{\ans,1} \rightarrow \pred,2} \cdot (1-\logit_{5,j_2})\cdot \label{eq-def-N-s-3-2-1} \\
    &~~~~~~~~~  \Big(\sum_{r\in\hat{\fA}_{j_2}}\ReLU^{\prime}(\Lambda_{5,j_2, r
    })
    \cdot \Big( V_{j_2,  r}(g_2)- \Lambda_{5,j_2, r}\pm\tilde{O}(\sigma_0) \Big)\pm \tilde{O}(\delta^{q}) \Big) 
    \1_{\tau(x_1)=s}\Bigg],\notag
\end{align}
      \begin{align}\label{eq-def-N-s-3-2-2}
       \cN_{s,3,2,\rom2}&= -\E\Bigg[
    \attn_{{\ans,1} \rightarrow \pred,2} \cdot\sum_{j\neq j_2\in\tau(\cY)}\logit_{5,j} \cdot \\
    &~~~~~~~~~~\Big(\sum_{r\in\hat{\fA}_{j}}\ReLU^{\prime}(\Lambda_{5,j,r
    })\cdot  \Big( V_{j, r}(g_2)- \Lambda_{5,j,r}\pm\tilde{O}(\sigma_0) \Big)\pm \tilde{O}(\delta^{q}) \Big) 
\1_{\tau(x_1)=s}\Bigg],\notag
\end{align}
      \begin{align}\label{eq-def-N-s-3-2-3}
       \cN_{s,3,2,\rom3}&= \pm\E\Bigg[
    \attn_{{\ans,1} \rightarrow \pred,2} \cdot\sum_{j\notin\tau(\cY)}\logit_{5,j}\tilde{O}(\sigma^{q}_0)\1_{\tau(x_1)=s}\Bigg].~~~~~~~~~~~~~~~~~~~~~~~~~~~~~~
\end{align}
    \item for $[\Q_{4,4}]_{s,s}$ with $s\in\tau(\X)$, we have $[-\nabla_{\Qb_{4,4}}\Loss_5^{2,2}]_{s,s}=\cN_{s,4,2,\rom1}+\cN_{s,4,2,\rom2}+\cN_{s,4,2,\rom3}$, where 
      \begin{align}\label{eq-def-N-s-4-2-1}
       \cN_{s,4,2,\rom1}&= \E\Bigg[
    \attn_{{\ans,1} \rightarrow \ans,1} \cdot (1-\logit_{5,j_2})\cdot \\
    &~~~~~~~~~  \Big(\sum_{r\in\hat{\fA}_{j_2}}\ReLU^{\prime}(\Lambda_{5,j_2, r
    })
    \cdot \Big( V_{j_2,  r}(y_1)- \Lambda_{5,j_2, r}\pm\tilde{O}(\sigma_0) \Big)\pm \tilde{O}(\delta^{q}) \Big) 
    \1_{\tau(x_1)=s}\Bigg],\notag
\end{align}
      \begin{align}\label{eq-def-N-s-4-2-2}
       \cN_{s,4,2,\rom2}&= -\E\Bigg[
    \attn_{{\ans,1} \rightarrow \ans,1} \cdot\sum_{j\neq j_2\in\tau(\cY)}\logit_{5,j} \cdot \\
    &~~~~~~~~~~\Big(\sum_{r\in\hat{\fA}_{j}}\ReLU^{\prime}(\Lambda_{5,j,r
    })\cdot  \Big( V_{j, r}(y_1)- \Lambda_{5,j,r}\pm\tilde{O}(\sigma_0) \Big)\pm \tilde{O}(\delta^{q}) \Big) 
\1_{\tau(x_1)=s}\Bigg],\notag
\end{align}
      \begin{align}\label{eq-def-N-s-4-2-3}
       \cN_{s,4,2,\rom3}&= \pm\E\Bigg[
    \attn_{{\ans,1} \rightarrow \ans,1} \cdot\sum_{j\notin\tau(\cY)}\logit_{5,j}\tilde{O}(\sigma^{q}_0)\1_{\tau(x_1)=s}\Bigg].~~~~~~~~~~~~~~~~~~~~~~~~~~~~~~
\end{align}
    \end{itemize}
\end{itemize}

\paragraph{Probabilistic Events.}  
We conclude this subsection by introducing several probabilistic events that will be used to simplify the characterization of activated neurons in the subsequent analysis.
\begin{align}
    \cE_{1} &\triangleq \Big\{g_1\not=g_2\Big\}, \label{eq-cyc-event-1}\\
    \cE_{2} &\triangleq \Big\{g_{\ell_1}(y_{\ell'_{1}})\not=g_{\ell_2}(y_{\ell'_{2}}), \text{ for any } (\ell_1,\ell'_1)\not=(\ell_2,\ell'_2), \text{ where } \ell_{k}\in [2], \ell'_k\in\{0,1\} \Big\}. \label{eq-cyc-event-2}
\end{align}
It is easy to see that $\cE_{1}$ and $\cE_{2}$ hold with high probability $1-O\big(\frac{1}{\log d}\big)$.

\subsection{Stage 2.1: Initial Growth of Gap}\label{sec-s22-cyc}

At the beginning of stage 2, since $\Qb$ has not been trained for long, we have the attention score is still close to the uniform structure. Therefore, for $\ell=1$, we have 
        \begin{align*}
            \Lambda_{5,j,r}(\Zb^{2,\ell-1})\approx \frac{1}{3}V_{j,r}(g_{1})+ \frac{1}{3}V_{j,r}(g_{2})+ \frac{1}{3}V_{j,r}(y_{0})\pm  \tilde{O}(\sigma_0). 
        \end{align*}
        If $\Zb^{2,\ell-1}\in \cE_1$,
        \begin{itemize}
            \item for $j=j_1$, for $r\in \hat{\fA}_{j_1}$, only $r_{g_1\cdot y_0}$ is activated since $\Lambda_{5,j_1,r_{g_1\cdot y_0}}\approx \frac{1}{3}B$ and $\Lambda_{5,j_1,r_{g_2\cdot g_2^{-1}(j_1)}}\approx -\frac{1}{3}B\ll -\varrho$; 
            \item for $j=j'_1\triangleq \tau\big(g_2(y_0)\big)$, only $r_{g_2\cdot y_0}$ is activated since $\Lambda_{5,j'_1,r_{g_2\cdot y_0}}\approx \frac{1}{3}B$ and $\Lambda_{5,j'_1,r_{g_1\cdot g_2^{-1}(j'_1)}}\approx -\frac{1}{3}B\ll -\varrho$; 
            \item for other $j\in\tau(\Y)$, we have $\Lambda_{5,j,r}\leq -\frac{1}{3}B$ for all $r\in \hat{\fA}_{j}$, thus no activation.
        \end{itemize}
        Moreover, for $\ell=2$, we have  
             \begin{align*}
            \Lambda_{5,j,r}(\Zb^{2,\ell-1})\approx \frac{1}{4}V_{j,r}(g_{1})+ \frac{1}{4}V_{j,r}(g_{2})+ \frac{1}{4}V_{j,r}(y_{0})+\frac{1}{4}V_{j,r}(y_{1})\pm  \tilde{O}(\sigma_0). 
        \end{align*}
        If $\Zb^{2,\ell-1}\in \cE_2$,
                \begin{itemize}
            \item for $j\in \{\tau(g_{\ell}(y_{\ell'}))\}_{\ell\in[2],\ell'\in\{0,1\}}$, only the corresponding $r_{g_\ell\cdot y_{\ell'}}$ is activated in the smoothed regime since $|\Lambda_{5,j,r_{g_\ell\cdot y_{\ell'}}}|=O(\delta)$ and $\Lambda_{5,j,r}\approx -\frac{1}{2}B\ll -\varrho$ for $r\in \hat{\fA}_{j}\setminus\{r_{g_\ell\cdot y_{\ell'}}\}$;
            \item for other $j\in\tau(\Y)$, we have $\Lambda_{5,j,r}\leq -\frac{1}{2}B$ for all $r\in \hat{\fA}_{j}$, thus no activation.
        \end{itemize}

Here, activation means that the corresponding $\ReLU^{\prime}(\Lambda_{5,j,r})$ is non-zero, which is crucial for the gradient computation.
        Based on the above observations, we can see that the gradient from $\ell=2$ is relatively small since $\Lambda$ is only activated in the smoothed regime. Thus, initially, the learning process is dominated by $\nabla_{\Qb}\Loss^{2,1}_5$. Moreover, if we take a closer look at the gradient from $\ell=1$, we have
   \begin{align*}
       \cN_{s,3,1,\rom2}&\approx -\E\Bigg[
    \attn_{{\ans,0} \rightarrow \pred,1} \cdot\logit_{5,j'_1} \cdot \\
    &\Big(\ReLU^{\prime}(\Lambda_{5,j'_1,r_{g_2\cdot y_0}
    })\cdot  \Big( V_{j'_1, r_{g_2\cdot y_0}}(g_1)- \Lambda_{5,j'_1,r_{g_2\cdot y_0}}\pm\tilde{O}(\sigma_0) \Big)\pm \tilde{O}(\delta^{q}) \Big) 
\1_{\tau(x_0)=s}\Bigg]\geq 0,\\
 \cN_{s,4,1,\rom2}&\approx -\E\Bigg[
    \attn_{{\ans,0} \rightarrow \ans,0} \cdot\logit_{5,j'_1} \cdot \\
    &\Big(\ReLU^{\prime}(\Lambda_{5,j'_1,r_{g_2\cdot y_0}
    })\cdot  \Big( V_{j'_1, r_{g_2\cdot y_0}}(y_0)- \Lambda_{5,j'_1,r_{g_2\cdot y_0}}\pm\tilde{O}(\sigma_0) \Big)\pm \tilde{O}(\delta^{q}) \Big) \leq 0,
\end{align*}
since $V_{j'_1, r_{g_2\cdot y_0}}(y_0)\geq \Omega(B)$ while $V_{j'_1, r_{g_2\cdot y_0}}(g_1)\leq -\Omega(B)$. Thus, $[\Qb_{4,3}]_{s,s}$ will have a significant positive gradient while $[\Qb_{4,4}]_{s,s}$ will have a negative counterpart. This will lead to the growth of the gap between $[\Qb_{4,3}]_{s,s}$ and $[\Qb_{4,4}]_{s,s}$.

We formally characterize this growth behavior within this substage. At the beginning of each sub-stage, we establish an induction hypothesis that we expect to hold throughout. Subsequently, we analyze the dynamics under this hypothesis within the substage, aiming to prove its validity by the end of sub-stage. Due to the symmetry of $[\Qb_{4,3}]_{s,s}$ and $[\Qb_{4,4}]_{s,s}$ across $s\in\tau(\X)$, we may, without loss of generality, focus on a particular $s \in \tau(\X)$.

\begin{induction}\label{induction-s21-cyc}
  Given $s\in\tau(\X)$,  let $T_{2,1,s}$ denote the first time that $[\Qb^{(t)}_{4,3}]_{s,s}$ reaches $\Omega\Big(\frac{1}{\log d}\Big)$.
     For all iterations $t< T_{2,1,s}$, we have the following holds
     \begin{enumerate}[(a)]
        \item $[\Qb^{(t)}_{4,3}]_{s,s}$ monotonically increases; 
        \item $\big|[\Qb^{(t)}_{4,4}]_{s,s}\big|\leq [\Qb^{(t)}_{4,3}]_{s,s}$ and $[\Qb^{(t)}_{4,3}]_{s,s}-[\Qb^{(t)}_{4,4}]_{s,s}= \Theta\Big([\Qb^{(t)}_{4,3}]_{s,s}\Big)$;
        \item for $p\in\{3,4\}$, for $s'\in\tau(\X)\not=s$, $|[\Qb^{(t)}_{4,p}]_{s,s'}|\leq O\Big(\frac{[\Qb^{(t)}_{4,p}]_{s,s}}{d}\Big)$. 
     \end{enumerate}
  \end{induction}
  \subsubsection{Attention and Logit Preliminaries}\label{sec-s21-attn-cyc}
  We first  introduce several properties of the attention scores and logits if \Cref{induction-s21-cyc} holds.
\begin{lemma}\label{lem-s21-attn-cyc}
    If \Cref{induction-s21-cyc} holds for all iterations $<t$, given input $\Zb^{2,\ell-1}$, then we have 
    \begin{enumerate}
        \item for $\ell=1$, 
        \begin{enumerate}
        \item $\attn^{(t)}_{\ans,0\to \pred, 1}\in \Big[\frac{1}{3}, \frac{1}{3}+O\big(\frac{1}{\log d}\big)\Big]$; 
         \item  $\attn^{(t)}_{\ans,0\to \pred,2}, \attn^{(t)}_{\ans,0\to \ans,0}\leq \attn^{(t)}_{\ans,0\to \pred,1}$;
         \item   $\big|\attn^{(t)}_{\ans,0\to \kk} -\attn^{(t)}_{\ans,0\to \kk^{\prime}}\big|\leq O(\frac{1}{\log d})$ for $\kk\not=\kk^{\prime}\in\cI^{(2,0)}$.
        \end{enumerate} 
         \item for $\ell=2$,  
         \begin{enumerate}
            \item $\attn^{(t)}_{\ans,1\to \pred, 2}\in \Big[\frac{1}{4}, \frac{1}{4}+O\big(\frac{1}{\log 
             d}\big)\Big]$; 
             \item  $\attn^{(t)}_{\ans,1\to \kk} \leq \attn^{(t)}_{\ans,1\to \pred,2}$ for $\kk\not= (\pred,2)$;
             \item   for $\kk\in \big\{(\ans,0), (\pred,1)\big\}$, $\kk^{\prime}\in \big\{(\ans,1), (\pred,2)\big\}$, 
             $$\big|\attn^{(t)}_{\ans,1\to \kk} -\attn^{(t)}_{\ans,1\to \kk^{\prime}}\big|\leq O\Big(\frac{1}{\log d}\Big);$$
                      \item $\big|\attn^{(t)}_{\ans,1\to \pred,1} -\attn^{(t)}_{\ans,1\to \ans, 0}\big|\leq O\big(\frac{1}{d}\big)$.  
            \end{enumerate} 
    \end{enumerate}
\end{lemma}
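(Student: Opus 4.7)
\medskip

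\textbf{Proof proposal.} The plan is to reduce everything to softmax-of-small-scores estimates, because the block structure of $\Qb$ together with the orthonormality of the token embeddings makes every pre-softmax score equal to a single entry of $\Qb_{4,3}$ or $\Qb_{4,4}$, which Induction~\ref{induction-s21-cyc} bounds by $O(1/\log d)$. First I would rewrite each relevant score using \eqref{eq:Q-role} and \Cref{def:token-embedding}: for $\ell=1$, the three candidate keys $\{\pred,1\}, \{\pred,2\}, \{\ans,0\}$ yield pre-softmax scores $[\Qb_{4,3}]_{\tau(x_0),\tau(x_0)}$, $[\Qb_{4,3}]_{\tau(x_0),\tau(x_1)}$, $[\Qb_{4,4}]_{\tau(x_0),\tau(x_0)}$ respectively; and for $\ell=2$, the four keys $\{\pred,1\},\{\pred,2\},\{\ans,0\},\{\ans,1\}$ yield $[\Qb_{4,3}]_{\tau(x_1),\tau(x_0)}$, $[\Qb_{4,3}]_{\tau(x_1),\tau(x_1)}$, $[\Qb_{4,4}]_{\tau(x_1),\tau(x_0)}$, $[\Qb_{4,4}]_{\tau(x_1),\tau(x_1)}$. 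This step is a direct consequence of \Cref{assump-Q-structure} and the fact that positions $4$ of a predicate clause and $3$ of an answer clause are the blank token with embedding $\mathbf{0}$.

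Next I would insert the induction hypothesis. By (a), the ``target'' score $[\Qb_{4,3}^{(t)}]_{\tau(x_{\ell-1}),\tau(x_{\ell-1})}$ is non-negative and bounded above by $\Omega(1/\log d)$ because $t<T_{2,1,s}$. By (b), the self-attention score $[\Qb_{4,4}^{(t)}]_{\tau(x_{\ell-1}),\tau(x_{\ell-1})}$ lies in $[-[\Qb_{4,3}^{(t)}]_{\tau(x_{\ell-1}),\tau(x_{\ell-1})},\,[\Qb_{4,3}^{(t)}]_{\tau(x_{\ell-1}),\tau(x_{\ell-1})}]$, and in particular is strictly dominated by the target score. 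Finally, by (c), all off-diagonal scores are bounded by $O([\Qb^{(t)}_{4,p}]_{\tau(x_{\ell-1}),\tau(x_{\ell-1})}/d)$, hence negligible at this stage. Consequently, the spread between the maximum and the minimum pre-softmax score is at most $2[\Qb^{(t)}_{4,3}]_{\tau(x_{\ell-1}),\tau(x_{\ell-1})} = O(1/\log d)$, and the target clause (namely $\pred,\ell$) always carries the largest score.

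The remaining step is a standard softmax linearization. If $n$ scores $a_1,\dots,a_n$ all lie in an interval of length $\epsilon\ll 1$, then
\begin{align*}
\frac{e^{a_i}}{\sum_{k}e^{a_k}} \;=\; \frac{1}{n} + \frac{a_i - \bar a}{n} + O(\epsilon^{2}),
\qquad \bar a \;=\; \tfrac{1}{n}\sum_k a_k,
\end{align*}
so each softmax weight differs from $1/n$ by $O(\epsilon)$, and the weight associated with the maximum score is at least $1/n$. Applying this with $n=3$ (for $\ell=1$) and $n=4$ (for $\ell=2$), and with $\epsilon=O(1/\log d)$, yields the two ``$[1/n, 1/n+O(1/\log d)]$'' bounds, the two ``$\leq$'' orderings, and the $O(1/\log d)$ pairwise-difference bound. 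For claim (2d), I would observe that the scores for $\pred,1$ and $\ans,0$ at $\ell=2$ are both off-diagonal and therefore of size $O([\Qb^{(t)}_{4,3}]_{\tau(x_1),\tau(x_1)}/d) = O(1/(d\log d))$, so the linearization bound tightens from $O(\epsilon)$ to $O(1/d)$.

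I do not anticipate a serious obstacle: the argument is essentially bookkeeping. The only point deserving care is that the ``off-diagonal'' scores are really off-diagonal entries of $\Qb_{4,3}$ or $\Qb_{4,4}$, which requires knowing $\tau(x_{\ell-1})\neq \tau(x_{\ell'-1})$ and $\tau(x_{\ell-1})\neq \tau(x_{\ell'})$ whenever the clause indices differ; this is guaranteed by sampling $\{x_0,x_1,x_2\}$ without replacement from $\cX$ in \Cref{assump:lego-data-distribution}. Collecting the above yields all seven bounds in the lemma.
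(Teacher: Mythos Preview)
Your proposal is correct and follows essentially the same approach as the paper: identify the pre-softmax scores as single entries of $\Qb_{4,3}$ or $\Qb_{4,4}$ via the block structure and orthonormal embeddings, bound those entries by $O(1/\log d)$ (diagonals) or $O(1/(d\log d))$ (off-diagonals) using Induction~\ref{induction-s21-cyc}, and then apply a first-order softmax estimate. The paper carries out the softmax step by writing $\attn_{\ans,0\to\pred,1}=1/(1+e^{\Delta_1}+e^{\Delta_2})$ and bounding the exponents directly, whereas you use the equivalent Taylor linearization $e^{a_i}/\sum_k e^{a_k}=1/n+(a_i-\bar a)/n+O(\epsilon^2)$; for claim~(2d) the paper likewise bounds $|e^x-e^y|\le O(|x-y|)$ on the off-diagonal pair, matching your observation that both scores are $O(1/(d\log d))$.
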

\begin{proof}
    For $\ell=1$, given $\Zb^{2,\ell-1}$, according to \Cref{assump-Q-structure}, 
we have 
    \begin{align*}
        \attn^{(t)}_{\ans,0\to \pred,1}&=\frac{e^{[\Qb^{(t)}_{4,3}]_{\tau(x_0),\tau(x_0)}}}{e^{[\Qb^{(t)}_{4,3}]_{\tau(x_0),\tau(x_0)}}+e^{[\Qb^{(t)}_{4,3}]_{\tau(x_0),\tau(x_1)}}+e^{[\Qb^{(t)}_{4,4}]_{\tau(x_0),\tau(x_0)}}}\\
        &=\frac{1}{1+e^{[\Qb^{(t)}_{4,3}]_{\tau(x_0),\tau(x_1)}-[\Qb^{(t)}_{4,3}]_{\tau(x_0),\tau(x_0)}}+e^{[\Qb^{(t)}_{4,4}]_{\tau(x_0),\tau(x_0)}-[\Qb^{(t)}_{4,3}]_{\tau(x_0),\tau(x_0)}}}.
    \end{align*}
    Thus, by \Cref{induction-s21-cyc}, $$-O\Big(\frac{1}{\log d}\Big)\leq [\Qb^{(t)}_{4,3}]_{\tau(x_0),\tau(x_1)}-[\Qb^{(t)}_{4,3}]_{\tau(x_0),\tau(x_0)}, [\Qb^{(t)}_{4,4}]_{\tau(x_0),\tau(x_0)}-[\Qb^{(t)}_{4,3}]_{\tau(x_0),\tau(x_0)}\leq 0, $$
    which implies that  $0 \leq \attn^{(t)}_{\ans,0\to \pred,1}\leq \frac{1}{3}+O\big(\frac{1}{\log d}\big)$. (b) is straightforward since  $$[\Qb^{(t)}_{4,3}]_{\tau(x_0),\tau(x_1)}, [\Qb^{(t)}_{4,4}]_{\tau(x_0),\tau(x_0)}\leq [\Qb^{(t)}_{4,3}]_{\tau(x_0),\tau(x_0)}.$$
(c) is a direct consequence of (a) and (b).

For $\ell=2$, given $\Zb^{2,\ell-1}$, (a)- (c) are very similar to the above analysis, and then for (d),  we have
    \begin{align*}
        &\attn^{(t)}_{\ans,1\to \pred,1}-\attn^{(t)}_{\ans,1\to \ans,0}\\&~~~=\frac{e^{[\Qb^{(t)}_{4,3}]_{\tau(x_1),\tau(x_0)}}-e^{[\Qb^{(t)}_{4,4}]_{\tau(x_1),\tau(x_0)}}}{e^{[\Qb^{(t)}_{4,3}]_{\tau(x_1),\tau(x_0)}}+e^{[\Qb^{(t)}_{4,3}]_{\tau(x_1),\tau(x_1)}}+e^{[\Qb^{(t)}_{4,4}]_{\tau(x_1),\tau(x_0)}}+e^{[\Qb^{(t)}_{4,4}]_{\tau(x_1),\tau(x_1)}}}\\
        &~~~\stackrel{(i)}{\leq} O\Big([\Qb^{(t)}_{4,3}]_{\tau(x_1),\tau(x_0)}-[\Qb^{(t)}_{4,4}]_{\tau(x_1),\tau(x_0)}\Big)\stackrel{(ii)}{\leq} 
  O(\frac{1}{d}),
    \end{align*}
    where (i) is due to the fact that $|e^x-e^y|\leq O(|x-y|)$ when $x,y$ are small, and (ii) is due to \Cref{induction-s21-cyc} (b).
\end{proof}
\begin{lemma}\label{lem-s21-act-cyc}
    If \Cref{induction-s21-cyc} holds for all iterations $<t$, given input $\Zb^{2,\ell-1}$, then we have
    \begin{enumerate}
        \item for $\ell=1$, if $\Zb^{2,\ell-1}\in \cE_1$, then
  \begin{enumerate}[(a)]
            \item for $j=j_1$, 
       $\Lambda^{(t)}_{5,j_1,r}\ll -\varrho$  for  $r\in \hat{\fA}_{j_1}\setminus\{r_{g_1\cdot y_0}\}$; 
            \item for $j=j'_1\triangleq \tau\big(g_2(y_0)\big)$, 
            $\Lambda^{(t)}_{5,j'_1,r}\ll -\varrho$ for  $r\in \hat{\fA}_{j'_1}\setminus\{r_{g_2\cdot y_0}\}$;
            \item for other $j\in\tau(\Y)$, $r$ is not activated for all $r\in \hat{\fA}_{j}$, i.e., $\Lambda^{(t)}_{5,j,r}\ll -\varrho$.
        \end{enumerate}

       \item $\ell=2$, if  $\Zb^{2,\ell-1}\in \cE_2$, then
                \begin{enumerate}[(a)]
            \item for $j\in \{\tau(g_{\ell}(y_{\ell'}))\}_{\ell\in[2],\ell'\in\{0,1\}}$, only the corresponding $r_{g_\ell\cdot y_{\ell'}}$ may be activated,  with $|\Lambda^{(t)}_{5,j,r_{g_\ell\cdot y_{\ell'}}}|\leq O(1)$, while  all other $\Lambda^{(t)}_{5,j,r}\ll -\varrho$ for $r\in \hat{\fA}_{j}\setminus\{r_{g_\ell\cdot y_{\ell'}}\}$;
            \item for other $j\in\tau(\Y)$, $r$ is not activated for all $r\in \hat{\fA}_{j}$, i.e., $\Lambda^{(t)}_{5,j,r}\ll -\varrho$.
        \end{enumerate}
    \end{enumerate}
        
\end{lemma}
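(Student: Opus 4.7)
\textbf{Proof plan for \Cref{lem-s21-act-cyc}.} The approach is to insert the attention approximations from \Cref{lem-s21-attn-cyc} and the feature magnitudes from \Cref{lem-prop-psi-cyc} into the expression for $\Lambda^{(t)}_{5,j,r}$ supplied by \Cref{lem-lambda-char}(a), and then perform a short case analysis on which $r_{g\cdot y}$ lie in $\hat{\fA}_j$, using the events $\cE_1,\cE_2$ and the simply transitive structure to rule out collisions. Concretely, for any $j\in\tau(\cY)$ and any activated $r\in\fA_j$, \Cref{lem-lambda-char}(a) gives
\[
\Lambda^{(t)}_{5,j,r}(\Zb^{2,\ell-1}) \;=\; \sum_{\ell'=1}^{2}\attn^{(t)}_{\ans,\ell-1\to\pred,\ell'}\,V_{j,r}(g_{\ell'}) \;+\; \sum_{\ell'=1}^{\ell}\attn^{(t)}_{\ans,\ell-1\to\ans,\ell'-1}\,V_{j,r}(y_{\ell'-1}) \;\pm\;\tilde O(\sigma_0).
\]
By \Cref{lem-s21-attn-cyc}, every attention mass equals $\tfrac{1}{2\ell+1}\pm O(1/\log d)$; since each $|V_{j,r}(\cdot)|\le B+O(\delta)=O(\log d)$, these $O(1/\log d)$ perturbations contribute only $O(1)$ to $\Lambda$. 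Thus $\Lambda^{(t)}_{5,j,r}$ is determined, up to an additive $O(1)$, by the uniformly weighted sum $\tfrac{1}{2\ell+1}\sum_{\ell'}V_{j,r}(g_{\ell'})+\tfrac{1}{2\ell+1}\sum_{\ell'}V_{j,r}(y_{\ell'-1})$.

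For part 1 ($\ell=1$, $\Zb\in\cE_1$), I then enumerate $\hat{\fA}_j$ for each of the three cases. Since $|\hat{\cG}|=2$ and $|\hat{\cY}|=1$, by simple transitivity each $\hat{\fA}_j$ contains at most three neurons. For $j=j_1$, the only non-target candidate is $r_{g_2\cdot y^\star}$ with $y^\star=g_2^{-1}(g_1(y_0))\ne y_0$ (guaranteed by $\cE_1$); by \Cref{lem-prop-psi-cyc}, $V(g_1)\approx-B$, $V(g_2)\approx B$, $V(y_0)\approx-B$, yielding $\Lambda\approx-B/3\ll-\varrho$. Case (b) is symmetric. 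For case (c) every candidate neuron $r_{g\cdot y}$ has exactly one matching component among $(g_1,g_2,y_0)$ and two mismatching, producing $\Lambda\approx-B/3\ll-\varrho$.

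For part 2 ($\ell=2$, $\Zb\in\cE_2$) the same substitution gives $\Lambda\approx \tfrac{1}{4}(V(g_1)+V(g_2)+V(y_0)+V(y_1))\pm O(1)$. For case (a), at the target neuron $r_{g_\ell\cdot y_{\ell'}}$ the matching group element and matching value each contribute $+B$ while the mismatched ones each contribute $-B$, so the signs cancel and $|\Lambda|=O(1)$. Any other neuron in $\hat{\fA}_j$ has at most one matching component (the other being some $g\notin\{g_1,g_2\}$ or $y\notin\{y_0,y_1\}$, with $\cE_2$ ruling out the alternative coincidence); one $+B$ against three $-B$'s gives $\Lambda\approx-B/2\ll-\varrho$. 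Case (b) is the same one-matching-component argument, now with no matching component at all giving $\Lambda\approx-B/2$ via the same enumeration.

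The main obstacle is the bookkeeping of which $(g,y)\in\fF_j$ lie in $\hat{\fA}_j$ and the sign of each $V_{j,r_{g\cdot y}}(\cdot)$: one must check that $\cE_1$ and $\cE_2$ kill all degenerate cases (e.g.\ $g_2^{-1}g_1(y_0)=y_0$, or $g_1(y_1)=g_2(y_0)$) that would otherwise cause two matching components to add coherently and destroy the $-B/2$ gap. Once this enumeration is carried out—which is short thanks to $|\hat{\cG}|,|\hat{\cY}|\le 2$ and simple transitivity—all four sub-claims follow from direct substitution and the bound $\varrho=\Theta(1/\polylog d)\ll B$.
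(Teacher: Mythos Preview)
Your approach is correct and essentially identical to the paper's: substitute the near-uniform attention weights from \Cref{lem-s21-attn-cyc} and the $\pm B$ feature magnitudes from \Cref{lem-prop-psi-cyc} into the expression for $\Lambda$ given by \Cref{lem-lambda-char}(a), then count signs; the paper's own proof is in fact terser than yours (it only writes out part 2(a) and declares the other cases ``straightforward''). One small slip: your closed form $\tfrac{1}{2\ell+1}$ for the uniform attention is wrong for $\ell=2$ (the index set $\cI^{2,1}$ has four clauses, not five), though you correctly use $\tfrac{1}{4}$ in the actual computation, so this is a typo rather than an error in reasoning.
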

\begin{proof}
    We only prove (a) for $\ell=2$ since the other cases are straightforward. 
$j=\tau(g_{\ell}(y_{\ell'}))$, by  \Cref{lem-lambda-char}
    we have
    \begin{align*}
            \Lambda_{5,j,r_{g_{\ell}\cdot y_{\ell'}}}&=  \attn^{(t)}_{\ans,1\to\pred,1}V_{j,r_{g_{\ell}\cdot y_{\ell'}}}(g_{1})+ \attn^{(t)}_{\ans,1\to\pred,2}V_{j,r_{g_{\ell}\cdot y_{\ell'}}}(g_{2})\\
            &+ \attn^{(t)}_{\ans,1\to\ans,0}V_{j,r_{g_{\ell}\cdot y_{\ell'}}}(y_{0})+ \attn^{(t)}_{\ans,1\to\ans,1}V_{j,r_{g_{\ell}\cdot y_{\ell'}}}(y_{1})\pm  \tilde{O}(\sigma_0). 
        \end{align*}
        Notice that since $\Zb^{2,\ell-1}\in \cE_2$, we have two $V$ terms are positive and two are negative with magnitude $B\pm O(\delta)$. Therefore, $|\Lambda_{5,j,r_{g_{\ell}\cdot y_{\ell'}}}|\leq O\Big(\frac{1}{\log d}\Big)\cdot B=O(1)$. 
\end{proof}
\begin{lemma}\label{lem-s21-lambda-cyc}
    If \Cref{induction-s21-cyc} holds for all iterations $<t$, given input $\Zb^{2,0}\in\cE_1$, then we have
      \begin{enumerate}[(a)]
            \item $\Lambda^{(t)}_{5,j_1,r_{g_1\cdot y_0}}\geq \frac{1}{3}B-O(1)$; 
            \item  $-O(\delta)\leq\Lambda^{(t)}_{5,j_1,r_{g_1\cdot y_0}}-\Lambda^{(t)}_{5,j'_1,r_{g_2\cdot y_0}}\leq O(1)$; 
        \end{enumerate}
\end{lemma}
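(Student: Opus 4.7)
The strategy is a direct computation: apply Lemma~\ref{lem-lambda-char}(a) to expand both pre-activations under the uniform-like attention regime guaranteed by Lemma~\ref{lem-s21-attn-cyc}, then plug in the feature-magnitude estimates of Lemma~\ref{lem-prop-psi-cyc} to read off the asymptotics. Throughout, the conditioning event $\cE_1$ (so $g_1\neq g_2$) ensures that $j_1=\tau(g_1(y_0))$ and $j_1'=\tau(g_2(y_0))$ are distinct classes with well-defined activated neurons $r_{g_1\cdot y_0}$ and $r_{g_2\cdot y_0}$.

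For part (a), I would expand
\[
\Lambda^{(t)}_{5,j_1,r_{g_1\cdot y_0}}
=\attn^{(t)}_{\ans,0\to\pred,1}V_{j_1,r_{g_1\cdot y_0}}(g_1)
+\attn^{(t)}_{\ans,0\to\pred,2}V_{j_1,r_{g_1\cdot y_0}}(g_2)
+\attn^{(t)}_{\ans,0\to\ans,0}V_{j_1,r_{g_1\cdot y_0}}(y_0)\pm\tilde O(\sigma_0).
\]
By \eqref{attn-init-prop-cyc-1} we have $V_{j_1,r_{g_1\cdot y_0}}(g_1),V_{j_1,r_{g_1\cdot y_0}}(y_0)=B\pm O(\delta)$, and by \eqref{attn-init-prop-cyc-3} $V_{j_1,r_{g_1\cdot y_0}}(g_2)\leq -B+O(\delta)$. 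Combining with Lemma~\ref{lem-s21-attn-cyc}(1), each attention mass is $\tfrac{1}{3}\pm O(1/\log d)$, so the expression evaluates to $(\attn^{(t)}_{\ans,0\to\pred,1}+\attn^{(t)}_{\ans,0\to\ans,0}-\attn^{(t)}_{\ans,0\to\pred,2})B\pm O(\delta)$. The three attention weights differ by $O(1/\log d)$, and since $B=\Theta(\log d)$, this is $\tfrac{1}{3}B\pm O(1)$, yielding the claimed lower bound.

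For part (b), the same expansion applied to $\Lambda^{(t)}_{5,j_1',r_{g_2\cdot y_0}}$ gives, using \eqref{attn-init-prop-cyc-1}--\eqref{attn-init-prop-cyc-3} applied to the feature combination $(g_2,y_0)\in\fF_{j_1'}$,
\[
\Lambda^{(t)}_{5,j_1',r_{g_2\cdot y_0}}
=\bigl(\attn^{(t)}_{\ans,0\to\pred,2}+\attn^{(t)}_{\ans,0\to\ans,0}-\attn^{(t)}_{\ans,0\to\pred,1}\bigr)B\pm O(\delta).
\]
Subtracting the two expressions, the $\ans,0\to\ans,0$ contributions cancel and one obtains
\[
\Lambda^{(t)}_{5,j_1,r_{g_1\cdot y_0}}-\Lambda^{(t)}_{5,j_1',r_{g_2\cdot y_0}}
=2\bigl(\attn^{(t)}_{\ans,0\to\pred,1}-\attn^{(t)}_{\ans,0\to\pred,2}\bigr)B\pm O(\delta).
\]
By Lemma~\ref{lem-s21-attn-cyc}(1.b) the attention gap is nonnegative (giving the $-O(\delta)$ lower bound after absorbing error terms), and by (1.c) it is $O(1/\log d)$, so multiplication by $B=\Theta(\log d)$ yields the $O(1)$ upper bound.

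The entire argument is essentially bookkeeping; the only subtlety worth flagging is ensuring that all $\tilde O(\sigma_0)$ perturbations in Lemma~\ref{lem-lambda-char} and the $O(\delta)$ slack from the feature-shape guarantees are absorbed into the stated $O(1)$ and $O(\delta)$ error terms, which is immediate because $\sigma_0\ll\delta\ll 1$. No induction or bootstrapping is needed here beyond invoking Induction~\ref{induction-s21-cyc} to justify the attention bounds via Lemma~\ref{lem-s21-attn-cyc}.
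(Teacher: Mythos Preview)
Your proposal is correct and follows essentially the same approach as the paper: expand via Lemma~\ref{lem-lambda-char}(a), substitute the feature-magnitude estimates from Lemma~\ref{lem-prop-psi-cyc}, and read off the answer using the near-uniform attention bounds of Lemma~\ref{lem-s21-attn-cyc}. The paper's bookkeeping is organized slightly differently---it first pairs $\attn_{\ans,0\to\pred,2}V(g_2)$ with $\attn_{\ans,0\to\ans,0}V(y_0)$ via the cancellation \eqref{attn-init-prop-cyc-3} rather than evaluating each $V$ individually to $\pm B\pm O(\delta)$---but this is a cosmetic rearrangement of the same sum, and both routes yield the identical final expression $2(\attn_{\ans,0\to\pred,1}-\attn_{\ans,0\to\pred,2})B\pm O(\delta)$ for part~(b).
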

\begin{proof}
    By \Cref{lem-lambda-char}
    we have
    \begin{align*}
            &\Lambda_{5,j_1,r_{g_1\cdot y_0}}=  \\
            &\attn^{(t)}_{\ans,0\to\pred,1}V_{j_1,r_{g_1\cdot y_0}}(g_{1})+ \attn^{(t)}_{\ans,0\to\pred,2}V_{j_1,r_{g_1\cdot y_0}}(g_{2})+ \attn^{(t)}_{\ans,0\to\ans,0}V_{j,r_{g_1\cdot y_0}}(y_{0})\pm  \tilde{O}(\sigma_0). 
        \end{align*}
        By \Cref{lem-s21-attn-cyc} and the cancellation in \eqref{attn-init-prop-cyc-3}, we have 
        \begin{align*}
            &\attn^{(t)}_{\ans,0\to\pred,2}V_{j_1,r_{g_1\cdot y_0}}(g_{2})+ \attn^{(t)}_{\ans,0\to\ans,0}V_{j_1,r_{g_1\cdot y_0}}(y_{0})\\
            &~~~\geq \Big(\frac{1}{3}- O\Big(\frac{1}{\log d}\Big)\Big)\cdot O(\delta)-O\Big(\frac{1}{\log d}\Big)\cdot B.
        \end{align*}
        Putting it back, and using the fact that $\attn^{(t)}_{\ans,0\to\pred,1}V_{j_1,r_{g_1\cdot y_0}}(g_{1})\geq \frac{1}{3}\cdot(B-O(\delta))$, we have 
        \begin{align*}
            \Lambda_{5,j_1,r_{g_1\cdot y_0}}&\geq   
            \frac{1}{3}\cdot (B-O(\delta))+ \Big(\frac{1}{3}- O\Big(\frac{1}{\log d}\Big)\Big)\cdot O(\delta)-O\Big(\frac{1}{\log d}\Big)\cdot B \pm  \tilde{O}(\sigma_0)\\&\geq \frac{1}{3}B-O(1).
        \end{align*}
        Moving on to (b), we have
        \begin{align*}
            &\Lambda_{5,j_1,r_{g_1\cdot y_0}}-\Lambda_{5,j'_1,r_{g_2\cdot y_0}}
           =\attn^{(t)}_{\ans,0\to\pred,1}(V_{j_1,r_{g_1\cdot y_0}}(g_{1})-V_{j'_1,r_{g_2\cdot y_0}}(g_{1}))\pm  \tilde{O}(\sigma_0)\\&+\attn^{(t)}_{\ans,0\to\pred,2}(V_{j_1,r_{g_1\cdot y_0}}(g_{2})-V_{j'_1,r_{g_2\cdot y_0}}(g_{2}))+\attn^{(t)}_{\ans,0\to\ans,0}(V_{j_1,r_{g_1\cdot y_0}}(y_{0})-V_{j'_1,r_{g_2\cdot y_0}}(y_{0}))\\
           &\leq \attn^{(t)}_{\ans,0\to\pred,1}\cdot (2B+O(\delta))- \attn^{(t)}_{\ans,0\to\pred,2}\cdot (2B-O(\delta))+ \attn^{(t)}_{\ans,0\to\ans,0}\cdot O(\delta)\\
           &\leq (2B-O(\delta))\cdot O\Big(\frac{1}{\log d}\Big)+O(\delta)\leq O(1).
        \end{align*}
        Similarly, we have
        \begin{align*}
            &\Lambda_{5,j_1,r_{g_1\cdot y_0}}-\Lambda_{5,j'_1,r_{g_2\cdot y_0}}\\
           &\geq \attn^{(t)}_{\ans,0\to\pred,1}\cdot (2B-O(\delta))- \attn^{(t)}_{\ans,0\to\pred,2}\cdot (2B+O(\delta))- \attn^{(t)}_{\ans,0\to\ans,0}\cdot O(\delta)\\
           &\geq- \attn^{(t)}_{\ans,0\to\pred,2}\cdot O(\delta)- \attn^{(t)}_{\ans,0\to\ans,0}\cdot O(\delta)\geq -O(\delta).
        \end{align*}
\end{proof}
\begin{lemma}
\label{lem-s21-logit-cyc}
    If \Cref{induction-s21-cyc} holds for all iterations $<t$, given input $\Zb^{2,\ell-1}$, then we have
\begin{enumerate}   
    \item for $\ell=1$, if $\Zb^{2,\ell-1}\in \cE_1$, $\logit^{(t)}_{5,j}=\Omega(1)$ for $j\in\{j_1,j_1^{\prime}\}$, $1-\logit^{(t)}_{5,j_1}-\logit^{(t)}_{5,j_1^{\prime}}=\frac{1}{\poly d}$. 
    \item for $\ell=2$,  if $\Zb^{2,\ell-1}\in \cE_2$, $\logit^{(t)}_{5,j}=O(\frac{1}{d})$ for all $j$. 
\end{enumerate}
\end{lemma}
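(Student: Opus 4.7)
}

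My plan is to convert the activation patterns established in \Cref{lem-s21-act-cyc} and the magnitude bounds in \Cref{lem-s21-lambda-cyc} into pointwise estimates on the logits $F_{5,j}(\Zb^{2,\ell-1})$, and then push these through a softmax calculation. The central tension is that although only $O(1)$ indices $j$ carry a large pre-softmax logit, the denominator of the softmax aggregates contributions from all $d$ vocabulary indices; the bound $B = C\log d$ with $C > 5$ (\Cref{assump:modify-relu}) is what ultimately beats this factor of $d$.

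First I would establish a uniform upper bound $F_{5,j}(\Zb^{2,\ell-1}) = o(1)$ for every $j$ outside the handful of distinguished indices. This uses (i) \Cref{lem-lambda-char}(c) together with the bound $\mathbf{sReLU}(x) \le \varpi B$ for $x\le -B$ and $\mathbf{sReLU}(x) = \Theta(x^q/\varrho^{q-1})$ for $|x|\le \varrho$, which controls the $j\notin\tau(\cY)$ contribution by $m\,\tilde O(\sigma_0^q/\varrho^{q-1}) = o(1)$; and (ii) for $j\in\tau(\cY)$ not in the set of distinguished indices, the combination of \Cref{lem-s21-act-cyc} (all $r\in \hat{\fA}_j$ satisfy $\Lambda_{5,j,r}\ll -\varrho$) and \Cref{lem-non-activated-neuron} (all $r\in \fA_j\setminus \hat{\fA}_j$ are saturated negatively), giving $F_{5,j} \le m\varpi B + m\,\tilde O(\sigma_0^q/\varrho^{q-1}) = o(1)$ thanks to the range $\varpi \le \lambda/d^{q/3}$.

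Next I would compute the large logits. For $\ell=1$ and $\Zb^{2,0}\in\cE_1$, \Cref{lem-s21-lambda-cyc}(a) gives $F_{5,j_1} \ge \Lambda_{5,j_1,r_{g_1\cdot y_0}} - \varrho(1-1/q) \ge B/3 - O(1)$ in the linear regime of $\mathbf{sReLU}$, and part (b) of the same lemma gives $|F_{5,j_1} - F_{5,j'_1}|\le O(1)$, with other-$r$ contributions within each of these two classes again bounded by $O(m\varpi B) = o(1)$. Plugging into the softmax, the denominator decomposes as $e^{F_{5,j_1}} + e^{F_{5,j'_1}} + \sum_{j\notin\{j_1,j'_1\}} e^{F_{5,j}}$, where the tail sum is at most $d\cdot e^{o(1)} = O(d)$ while each of the two leading terms is at least $e^{B/3 - O(1)} = d^{C/3}/\text{const}$. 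Since $C>5$, this yields $\logit^{(t)}_{5,j_1},\logit^{(t)}_{5,j'_1}=\Omega(1)$ and $1-\logit^{(t)}_{5,j_1}-\logit^{(t)}_{5,j'_1} = O(d/d^{C/3}) = 1/\poly(d)$, proving part (1).

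Finally, for $\ell=2$ and $\Zb^{2,1}\in\cE_2$, \Cref{lem-s21-act-cyc}(2) bounds every $\Lambda^{(t)}_{5,j,r_{g_\ell\cdot y_{\ell'}}}$ by $O(1)$ and all other activations by $\ll -\varrho$, so $F_{5,j} \le O(1)$ for every $j$. The denominator of the softmax is at least $d\cdot e^{-O(1)}$ (since $\mathbf{sReLU}(0) \ge 0$ and hence $F_{5,j}\ge -o(1)$ for all $j$), while each numerator is at most $e^{O(1)}$, yielding $\logit^{(t)}_{5,j} = O(1/d)$ for every $j$ as claimed. The one subtlety to watch is the lower bound on the denominator: I need to confirm that $F_{5,j}$ cannot be driven significantly negative, which follows from the fact that $\mathbf{sReLU}$ is bounded below by $0$ on $[-\varrho,\infty)$ and by $-\varpi\cdot O(1) = o(1/d)$ in the smoothed negative band, so the aggregate lower tail is negligible. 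Beyond this book-keeping, the argument is essentially a direct softmax computation; I do not expect any novel obstacle, only a careful accounting of the $\mathbf{sReLU}$ contributions across the several activation regimes.
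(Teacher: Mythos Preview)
Your proposal is correct and follows essentially the same route as the paper: bound $F_{5,j}$ for the non-distinguished indices using \Cref{lem-s21-act-cyc}, \Cref{lem-non-activated-neuron}, and \Cref{lem-lambda-char}, compute the two (resp.\ four) distinguished logits via \Cref{lem-s21-lambda-cyc} (resp.\ \Cref{lem-s21-act-cyc}(2)), and then do a direct softmax calculation using $B=C_B\log d$ with $C_B$ large to dominate the $O(d)$ tail. Your care about the denominator lower bound is in fact unnecessary since $\mathbf{sReLU}\ge 0$ everywhere (both in \Cref{def:smooth-relu} and \Cref{assump:modify-relu}), so $F_{5,j}\ge 0$ for all $j$ automatically.
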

\begin{proof}
    \begin{itemize}[left=0pt]
        \item For $\ell=1$, by \Cref{lem-s21-act-cyc} and \Cref{lem-lambda-char}, we have 
        \begin{align*}
           & F^{(t)}_{5, j_1}(\Zb^{2,\ell-1})=\sum_{r\in[m]} \ReLU\big(\Lambda^{(t)}_{5,j_1,r}\big)=\Lambda^{(t)}_{5,j_1,r_{g_1\cdot y_0}}+ \varrho(m/q-1)=\Lambda^{(t)}_{5,j_1,r_{g_1\cdot y_0}}+O\Big(\frac{1}{\polylog d}\Big),\\
            & F^{(t)}_{5, j'_1}(\Zb^{2,\ell-1})=\sum_{r\in[m]} \ReLU\big(\Lambda^{(t)}_{5,j'_1,r}\big)=\Lambda^{(t)}_{5,j'_1,r_{g_2\cdot y_0}}+ \varrho(m/q-1),\\
            & F^{(t)}_{5, j}(\Zb^{2,\ell-1})=\sum_{r\in[m]} \ReLU\big(\Lambda^{(t)}_{5,j,r}\big)\leq O\Big(\frac{1}{\polylog d}\Big) \text{ for } j\not=j_1,j'_1\in\tau(\Y),\\
          & F^{(t)}_{5, j}(\Zb^{2,\ell-1})\leq m\cdot \tilde{O}(\sigma_0^{q})  \text{ for } j\notin\tau(\Y).
        \end{align*}
        Putting it together, we obtain
        \begin{align*}
            \logit^{(t)}_{5,j_1}&=\frac{1}{1+e^{F^{(t)}_{5, j'_1}-F^{(t)}_{5, j_1}}+\Big(\sum_{j\not= j_1, j'_1 \in \tau(\Y)}e^{F^{(t)}_{5, j}}+\sum_{j\notin \tau(\Y)}e^{F^{(t)}_{5, j}}\Big)\cdot e^{-F^{(t)}_{5, j_1}}}\\
            &=\frac{1}{1+e^{\Lambda^{(t)}_{5,j'_1,r_{g_2\cdot y_0}}-\Lambda^{(t)}_{5,j_1,r_{g_1\cdot y_0}}}+\Big(O(\log d) \cdot e^{O(\frac{1}{\polylog d})}+O(d)\cdot e^{\tilde{O}(m\sigma_0^q)}\Big)\cdot e^{-F^{(t)}_{5, j_1}}}.
        \end{align*}
        Thus, by \Cref{lem-s21-lambda-cyc}, and the fact that $B=C_B\log d$ for some sufficiently large constant $C_B>0$,  we have $\logit^{(t)}_{5,j_1}=\frac{1}{1+e^{-O(\delta)}+O(1)\cdot e^{-(C_B/3-1)\log d}}=\Omega(1)$. Similarly, we have
        \begin{align*}
            \logit^{(t)}_{5,j'_1}
            &=\frac{1}{1+e^{-\Lambda^{(t)}_{5,j'_1,r_{g_2\cdot y_0}}+\Lambda^{(t)}_{5,j_1,r_{g_1\cdot y_0}}}+\Big(O(\log d) \cdot e^{O(\frac{1}{\polylog d})}+O(d)\cdot e^{\tilde{O}(m\sigma_0^q)}\Big)\cdot e^{-F^{(t)}_{5, j'_1}}}\\
            &=\frac{1}{1+e^{O(1)}+O(1)\cdot e^{-(C_B/3-1)\log d}}=\Omega(1).
        \end{align*}
     From the above analysis,   it is easy to see that $1-\logit^{(t)}_{5,j_1}-\logit^{(t)}_{5,j'_1}\leq O\Big(\frac{1}{e^{(C_B/3-1)\log d}}\Big)=O(\frac{1}{\poly d})$.
   \item For $\ell=2$, by \Cref{lem-s21-act-cyc} and \Cref{lem-lambda-char}, we have 
        \begin{align*}
           & F^{(t)}_{5, j}(\Zb^{2,\ell-1})=\sum_{r\in[m]} \ReLU\big(\Lambda^{(t)}_{5,j_1,r}\big)\in [\varrho m/q, O(1)+ \varrho(m/q-1)] \text{ for }  j\in \{\tau(g_{\ell}(y_{\ell'}))\}_{\ell\in[2],\ell'\in\{0,1\}} \\
            & F^{(t)}_{5, j}(\Zb^{2,\ell-1})=\sum_{r\in[m]} \ReLU\big(\Lambda^{(t)}_{5,j,r}\big)\leq O\Big(\frac{1}{\polylog d}\Big) \text{ for } j\in\tau(\Y)\setminus \{\tau(g_{\ell}(y_{\ell'}))\}_{\ell\in[2],\ell'\in\{0,1\}} \\
          & F^{(t)}_{5, j}(\Zb^{2,\ell-1})\leq m\cdot \tilde{O}(\sigma_0^{q})  \text{ for } j\notin\tau(\Y).
        \end{align*}
        Therefore, for any $j$, we have $\logit^{(t)}_{j}=O\big(\frac{1}{d}\big)$ since  $F^{(t)}_{5, j'}\leq O(1)$ for all $j'$. 
    \end{itemize}
\end{proof}
In the following, we illustrate the activations on the non-high probability event. 

\begin{lemma}\label{lem-s21-act-other-cyc}
     If \Cref{induction-s21-cyc} holds for all iterations $<t$, given input $\Zb^{2,\ell-1}$, then we have
         \begin{enumerate}
        \item for $\ell=1$, if $\Zb^{2,\ell-1}\notin \cE_1$, then
  \begin{enumerate}[(a)]
            \item for $j=j_1$, $\hat{\fA}_{j_1}=\{r_{g_1\cdot y_0}\}$, $\Lambda^{(t)}_{5,j_1,r_{g_1\cdot y_0}}= B\pm O(\delta)$;
            \item for $j\neq j_1\in\tau(\Y)$, assuming $j=\tau(g_1(y))$, then $\Lambda^{(t)}_{5,j,r_{g_1\cdot y}}=\frac{1}{3}B\pm O(1)$ and    $\Lambda^{(t)}_{5,j,r}\ll -\varrho$ for $r\in \hat{\fA}_{j}\setminus\{r_{g_1\cdot y}\}$.
        \end{enumerate}

       \item $\ell=2$, if  $\Zb^{2,\ell-1}\notin \cE_2$, then
                \begin{enumerate}[(a)]
                    \item if $g_1=g_2\wedge y_0\neq y_1$,
                    \begin{enumerate}
                        \item for $j=j_2$, $\Lambda^{(t)}_{5,j_2,r_{g_2\cdot y_1}}= \frac{1}{2}B\pm O(1)$ and $\Lambda^{(t)}_{5,j_2,r}\ll -\varrho$ for $r\in \hat{\fA}_{j_2}\setminus\{r_{g_2\cdot y_1}\}$;
                        \item for $j=\tau(g_2(y_0))$, $\Lambda^{(t)}_{5,j_2,r_{g_2\cdot y_0}}= \frac{1}{2}B\pm O(1)$ and $\Lambda^{(t)}_{5,j,r}\ll -\varrho$ for $r\in \hat{\fA}_{j}\setminus\{r_{g_2\cdot y_0}\}$;
                        \item for other $j\in\tau(\cY)$, assuming $j=\tau(g_2(y))$,   $|\Lambda^{(t)}_{5,j,r_{g_2\cdot y}}|\leq O(1)$ and $\Lambda^{(t)}_{5,j,r}\ll -\varrho$ for $r\in \hat{\fA}_{j}\setminus\{r_{g_2\cdot y}\}$.
                    \end{enumerate}
                  \item if $g_1\neq g_2\wedge y_0= y_1$,
                    \begin{enumerate}
                        \item for $j=j_2$, $\Lambda^{(t)}_{5,j_2,r_{g_2\cdot y_1}}= \frac{1}{2}B\pm O(1)$ and $\Lambda^{(t)}_{5,j_2,r}\ll -\varrho$ for $r\in \hat{\fA}_{j_2}\setminus\{r_{g_2\cdot y_1}\}$;
                        \item for $j=\tau(g_1(y_1))$, $\Lambda^{(t)}_{5,j,r_{g_1\cdot y_1}}= \frac{1}{2}B\pm O(1)$ and $\Lambda^{(t)}_{5,j,r}\ll -\varrho$ for $r\in \hat{\fA}_{j}\setminus\{r_{g_1\cdot y_1}\}$;
                        \item for other $j\in\tau(\cY)$, assuming $j=\tau(g(y_1))$,   $|\Lambda^{(t)}_{5,j,r_{g\cdot y_1}}|\leq O(1)$ and $\Lambda^{(t)}_{5,j,r}\ll -\varrho$ for $r\in \hat{\fA}_{j}\setminus\{r_{g\cdot y_1}\}$.
                    \end{enumerate}
                \item if $g_1=g_2\wedge y_0= y_1$,
                    \begin{enumerate}
                        \item for $j=j_2$, $ \hat{\fA}_{j_2}=\{r_{g_2\cdot y_1}\}$, $\Lambda^{(t)}_{5,j_2,r_{g_2\cdot y_1}}= B\pm O(\delta)$ ;
                        \item for $j\neq j_2\in\tau(\Y)$,     $|\Lambda^{(t)}_{5,j,r}|\leq O(1)$ for all  $r\in \hat{\fA}_{j}$.
                    \end{enumerate}
          \item $g_1\neq g_2\wedge y_0\neq y_1\wedge \Big(g_1(y_0)=g_2(y_1)\vee g_{2}(y_0)=g_1(y_1)\Big)$,  $|\Lambda^{(t)}_{5,j,r}|\leq O(1)$ for all  $r\in \hat{\fA}_{j}$.
        \end{enumerate}
    \end{enumerate}

\end{lemma}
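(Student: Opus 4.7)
The plan is to prove \Cref{lem-s21-act-other-cyc} by carefully enumerating the low-probability degenerate configurations and, in each, applying the $\Lambda$ decomposition from \Cref{lem-lambda-char} together with the near-uniform attention bounds from \Cref{lem-s21-attn-cyc} and the feature-shape properties from \Cref{lem-prop-psi-cyc}. The key observation is that outside $\cE_1$ and $\cE_2$, the repetitions $g_1=g_2$ and/or $y_0=y_1$ and/or collisions $g_i(y_{i'})=g_j(y_{j'})$ change which neurons belong to $\hat{\fA}_j$ and, more importantly, change which feature magnitudes contribute constructively versus destructively inside $\Lambda_{5,j,r}$. So the work is arithmetic bookkeeping case by case, weighing positive contributions $\approx B$ against the cancellations $O(\delta)$ guaranteed by \eqref{attn-init-prop-cyc-2}–\eqref{attn-init-prop-cyc-3}, with attention coefficients pinned to $\tfrac{1}{3}+O(1/\log d)$ (for $\ell=1$) or $\tfrac{1}{4}+O(1/\log d)$ (for $\ell=2$).

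Concretely, for the $\ell=1$ case $g_1=g_2$: since only the single pair $(g_1,y_0)$ appears, $\hat{\fA}_{j_1}=\{r_{g_1\cdot y_0}\}$ is immediate, and the two predicate-attention terms aggregate into $(\attn_{\ans,0\to\pred,1}+\attn_{\ans,0\to\pred,2})V_{j_1,r_{g_1\cdot y_0}}(g_1)$; combined with $\attn_{\ans,0\to\ans,0}V_{j_1,r_{g_1\cdot y_0}}(y_0)$ and using that these three attentions sum to $1$ and that both $V$-terms equal $B\pm O(\delta)$, we get $\Lambda=B\pm O(\delta)$. For $j=\tau(g_1(y))\ne j_1$ the relevant neuron is $r_{g_1\cdot y}$, which sees $V_{j,r}(g_1)\approx B$ twice (weight $\tfrac13+\tfrac13$) but only the off-target value $V_{j,r}(y_0)$ with weight $\tfrac13$; applying the pair-cancellation \eqref{attn-init-prop-cyc-2} to the $y_0$-contribution bounds it by $O(\delta)$, yielding $\Lambda\approx \tfrac23 B$ — except this doesn't match the claim $\tfrac13 B\pm O(1)$. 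The correct accounting uses that $V_{j,r_{g_1\cdot y}}(g_1)\approx B$ but $V_{j,r_{g_1\cdot y}}(g_2)=V_{j,r_{g_1\cdot y}}(g_1)\approx B$ double-counts the shared group element; this over-counting is balanced against $V_{j,r_{g_1\cdot y}}(y_0)\approx -B$ (the wrong value, by \eqref{attn-init-prop-cyc-3}), producing a net $\tfrac13 B\pm O(1)$ after the $1/\log d$ attention perturbations are absorbed into $O(1)$. For neurons $r\ne r_{g_1\cdot y}$, at least one of the four $V$-values is $-B\pm O(\delta)$ with attention mass $\Omega(1)$, driving $\Lambda\ll -\varrho$.

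The $\ell=2$ case proceeds analogously, but with four possible matches among $\{g_1,g_2\}\times\{y_0,y_1\}$. For sub-case (a) ($g_1=g_2$, $y_0\neq y_1$) and sub-case (b) ($g_1\neq g_2$, $y_0=y_1$), the activated neuron for $j_2$ sees three concurring positive features out of four slots (weights $\tfrac14$ each), yielding $\Lambda=\tfrac12 B\pm O(1)$, and symmetrically for the paired class $\tau(g_2(y_0))$ or $\tau(g_1(y_1))$. For the remaining classes $\tau(g_2(y))$ or $\tau(g(y_1))$ one concurs and one cancels, giving $|\Lambda|\leq O(1)$. Sub-case (c), total coincidence $g_1=g_2$, $y_0=y_1$, collapses to the $\ell=1$ situation: $\hat{\fA}_{j_2}=\{r_{g_2\cdot y_1}\}$ and $\Lambda=B\pm O(\delta)$, with all off-target $\Lambda$'s bounded by $O(1)$ since every $V$-slot either fully concurs or fully cancels. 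Sub-case (d), the value-collision event, is treated by observing that among the four candidate contributions to $\Lambda$ two are $\approx B$ and two are $\approx -B$ (after \eqref{attn-init-prop-cyc-3}), so with attentions $\tfrac14+O(1/\log d)$ the leading terms cancel up to $O(1)$. Throughout, the $\tilde O(\sigma_0)$ residual from \Cref{lem-lambda-char} is negligible compared with the stated tolerances.

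The main obstacle is neither the algebra nor the attention-bound propagation, which are routine, but rather the bookkeeping: for every low-probability event one must enumerate all $j\in\tau(\cY)$ and all $r\in \hat\fA_j$, correctly identify which $V$-slots land on \eqref{attn-init-prop-cyc-1}, which on \eqref{attn-init-prop-cyc-2}–\eqref{attn-init-prop-cyc-3}, and which on \eqref{attn-init-prop-cyc-4}, and then sum with attention weights that are uniform only up to $O(1/\log d)$ slack. Because $B=C_B\log d$, a uniform-attention slack of $O(1/\log d)$ multiplied by $B$ is $O(1)$, which is exactly why the non-$\cE$-events only admit looseness $O(1)$ instead of $O(\delta)$ in most branches; keeping track of this scale of slack versus the $O(\delta)$ cancellation slack in each sub-case is where care is required. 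No new analytic idea is needed beyond the toolbox already established in this subsection.
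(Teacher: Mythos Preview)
Your approach is correct and is exactly what the paper has in mind (the paper states this lemma without proof, treating it as a routine consequence of \Cref{lem-lambda-char}, \Cref{lem-s21-attn-cyc}, and \Cref{lem-prop-psi-cyc}). One small slip: in the $\ell=1$, $j\neq j_1$ computation you cite \eqref{attn-init-prop-cyc-3} for $V_{j,r_{g_1\cdot y}}(y_0)\approx -B$, but the relevant identity is \eqref{attn-init-prop-cyc-2} (the $y'\neq y$ cancellation); your arithmetic after the self-correction is right, and the intermediate ``$\tfrac23 B$'' paragraph should simply be dropped in a clean write-up.
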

With the characterization of activated neurons, we can derive the following logits for the non-high probability event.
\begin{lemma}\label{lem-s21-logit-other-cyc}
     If \Cref{induction-s21-cyc} holds for all iterations $<t$, given input $\Zb^{2,\ell-1}$, then we have
         \begin{enumerate}
        \item for $\ell=1$, if $\Zb^{2,\ell-1}\notin \cE_1$, then $1-\logit^{(t)}_{5,j_1}=O\Big(\frac{1}{\poly d}\Big)$.
       \item $\ell=2$, if  $\Zb^{2,\ell-1}\notin \cE_2$, then
                \begin{enumerate}[(a)]
                    \item if $g_1=g_2\wedge y_0\neq y_1$, $\logit^{(t)}_{5,j}=\Omega(1)$ for $j\in\{j_2,\tau(g_2(y_0))\}$, $1-\logit^{(t)}_{5,j_2}-\logit^{(t)}_{5,\tau(g_2(y_0))}=\frac{1}{\poly d}$. 
                  \item if $g_1\neq g_2\wedge y_0= y_1$, $\logit^{(t)}_{5,j}=\Omega(1)$ for $j\in\{j_2,\tau(g_1(y_1))\}$, $1-\logit^{(t)}_{5,j_2}-\logit^{(t)}_{5,\tau(g_1(y_1))}=\frac{1}{\poly d}$. 
                \item if $g_1=g_2\wedge y_0= y_1$, $1-\logit^{(t)}_{5,j_2}=O\Big(\frac{1}{\poly d}\Big)$.
          \item $g_1\neq g_2\wedge y_0\neq y_1\wedge \Big(g_1(y_0)=g_2(y_1)\vee g_{2}(y_0)=g_1(y_1)\Big)$,  $\logit^{(t)}_{5,j_2}=O(\frac{1}{d})$ for all $j$. 
        \end{enumerate}
    \end{enumerate}
\end{lemma}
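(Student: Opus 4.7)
The plan is to mirror the proof of \Cref{lem-s21-logit-cyc}, but feed in the activation characterization from \Cref{lem-s21-act-other-cyc} rather than \Cref{lem-s21-act-cyc}, and then plug the resulting $F^{(t)}_{5,j}$ values into the softmax \eqref{eqdef:p_F-distribution}. Throughout, I will repeatedly use the following three facts: (i) $\ReLU(\Lambda) = \Lambda - \varrho(1-1/q)$ in the linear regime $\Lambda > \varrho$ and $\ReLU(\Lambda) \in [0, O(\delta^q/\varrho^{q-1})]$ in the smoothed regime $|\Lambda|\le \varrho$; (ii) by \Cref{lem-prop-irrelavant-cyc} and \Cref{lem-lambda-char}(c), $|F^{(t)}_{5,j}|\le m\cdot\tilde O(\sigma_0^q) = o(1)$ for all $j\notin\tau(\cY)$; (iii) $B = C_B\log d$ with $C_B$ sufficiently large, so every gap of the form $e^{B-O(1)}$ versus polynomially many $e^{O(1)}$ contributions produces a $1-1/\poly(d)$ factor in the softmax.

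For part 1 ($\ell=1$, $\Zb^{2,0}\notin\cE_1$, i.e.\ $g_1=g_2$), \Cref{lem-s21-act-other-cyc}(1a) gives $F^{(t)}_{5,j_1} = B \pm O(\delta) + O(1/\polylog d)$ from the unique activated neuron $r_{g_1\cdot y_0}$, while \Cref{lem-s21-act-other-cyc}(1b) gives $F^{(t)}_{5,j} \le \tfrac13 B + O(1)$ for any other $j\in\tau(\cY)$, and fact (ii) above handles $j\notin\tau(\cY)$. Substituting into the softmax yields
\[
1-\logit^{(t)}_{5,j_1} \;\le\; \frac{O(d)\cdot e^{B/3+O(1)}}{e^{B-O(\delta)}} \;=\; O\!\left(\frac{1}{d^{\,C_B\cdot 2/3 - 1}}\right) \;=\; \frac{1}{\poly(d)},
\]
as desired. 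For part 2(c) the argument is essentially identical: \Cref{lem-s21-act-other-cyc}(2c) isolates $j_2$ as the only class with a $B$-scale logit and the remaining $j\in\tau(\cY)$ contribute at most $O(1)$, giving $1-\logit^{(t)}_{5,j_2} = O(1/\poly d)$.

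For parts 2(a) and 2(b), the structure is completely analogous to the $\ell=1$ case treated in \Cref{lem-s21-logit-cyc}: \Cref{lem-s21-act-other-cyc}(2a)(i)-(ii) (respectively (2b)(i)-(ii)) exhibits exactly two classes whose dominant activation is $\tfrac12 B \pm O(1)$ and whose pre-activation difference is $O(1)$, while all remaining $j\in\tau(\cY)$ have $F^{(t)}_{5,j}\le O(1)$ by (2a)(iii) (respectively (2b)(iii)). The softmax computation from the proof of \Cref{lem-s21-logit-cyc} then transfers verbatim: each of the two top logits is $\Omega(1)$, their ratio is $e^{\pm O(1)}$, and the remaining mass is $O(1/\poly d)$ because $e^{B/2}$ dominates $d\cdot e^{O(1)}$ by our choice of $C_B$. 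Finally, for part 2(d), \Cref{lem-s21-act-other-cyc}(2d) gives $|F^{(t)}_{5,j}|\le O(1)$ uniformly over $j\in\tau(\cY)$, so the denominator of the softmax is dominated by $\Theta(d)\cdot e^{o(1)}$ from the $j\notin\tau(\cY)$ terms (whose numerators are all $e^{o(1)}$), yielding $\logit^{(t)}_{5,j} = O(1/d)$ for every $j$.

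The computation is essentially routine bookkeeping once the activation patterns of \Cref{lem-s21-act-other-cyc} are in hand; the mild technical point is tracking that the $O(1/\poly d)$ error in the first case and in case 2(c) comes from the geometric excess $B - \tfrac{B}{3} = \Omega(\log d)$ (respectively, from all competing classes sitting at $O(1)$ scale), rather than from any cancellation among activated neurons. Thus no tensor-power-method-type argument is needed here; the lemma reduces to a softmax-tail estimate driven by the already-established $\Lambda$-level separations.
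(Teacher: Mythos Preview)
The proposal is correct and takes essentially the same approach as the paper, which states \Cref{lem-s21-logit-other-cyc} without an explicit proof, treating it as a direct consequence of the activation characterization in \Cref{lem-s21-act-other-cyc} combined with the softmax computation template already developed in the proof of \Cref{lem-s21-logit-cyc}. Your case-by-case bookkeeping of the $F^{(t)}_{5,j}$ levels and the subsequent softmax tail estimates faithfully carries out this implicit argument.
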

\subsubsection{Gradient Lemma}
 
\begin{lemma}\label{lem-s21-gd1-cyc}
    If \Cref{induction-s21-cyc} holds for all iterations $<t$, given $s\in\tau(\X)$,  for $[\Qb^{(t)}_{4,3}]_{s,s}$, we have
    \begin{align*}
       \sum_{\ell=1}^2 \Big[-\nabla_{\Q^{(t)}_{4,3}}\Loss^{2,\ell}_{5}\Big]_{s,s}= \Theta\Big(\frac{\log d}{d}\Big).
    \end{align*}

\end{lemma}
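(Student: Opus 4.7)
} My plan is to substitute the refined expressions from Lemma~\ref{lem-refined-grad-Q43}, which decompose
\(
\sum_{\ell=1}^{2}[-\nabla_{\Q^{(t)}_{4,3}}\Loss^{2,\ell}_{5}]_{s,s}
\)
into the six pieces \(\cN_{s,3,\ell,\textup{\romannumeral 1}}\), \(\cN_{s,3,\ell,\textup{\romannumeral 2}}\), \(\cN_{s,3,\ell,\textup{\romannumeral 3}}\) for \(\ell\in\{1,2\}\), and then show (i) the \(\ell=1\) pieces produce a positive contribution of order \(\Theta(\log d/d)\), and (ii) the \(\ell=2\) pieces are non-negative and of at most the same order, so the sum is \(\Theta(\log d/d)\). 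I intend to always condition on the high-probability event \(\cE_{1}\cap\{\tau(x_0)=s\}\) (probability \(\Theta(1/d)\)) for \(\ell=1\) and on \(\cE_{2}\cap\{\tau(x_1)=s\}\) (probability \(\Theta(1/d)\)) for \(\ell=2\), treating the complement events as a \(o(\log d/d)\) correction via Lemmas~\ref{lem-s21-act-other-cyc}--\ref{lem-s21-logit-other-cyc} since \(\Pr(\cE_{1}^c)\vee\Pr(\cE_{2}^c)=O(1/\log d)\).

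For the dominant \(\ell=1\) contribution, conditional on \(\cE_1\cap\{\tau(x_0)=s\}\), Lemma~\ref{lem-s21-attn-cyc} gives \(\attn_{\ans,0\to\pred,1}\in[1/3,\,1/3+O(1/\log d)]\); Lemma~\ref{lem-s21-logit-cyc} gives \(1-\logit_{5,j_1}=\Omega(1)\) and \(\logit_{5,j_1'}=\Omega(1)\); Lemma~\ref{lem-s21-act-cyc} tells us the only active neurons are \(r_{g_1\!\cdot\! y_0}\) for class \(j_1\) and \(r_{g_2\!\cdot\! y_0}\) for class \(j_1'\); and Lemma~\ref{lem-s21-lambda-cyc} combined with Lemma~\ref{lem-prop-psi-cyc} yields \(\Lambda_{5,j_1,r_{g_1\cdot y_0}}\ge B/3-O(1)>\varrho\), hence \(\ReLU'(\Lambda)=1\). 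Plugging these into \(\cN_{s,3,1,\textup{\romannumeral 1}}\), the inner bracket becomes \(V_{j_1,r_{g_1\cdot y_0}}(g_1)-\Lambda_{5,j_1,r_{g_1\cdot y_0}}=B-\Lambda=\Theta(\log d)\), and multiplying by the conditional probability produces \(\cN_{s,3,1,\textup{\romannumeral 1}}=\Theta(\log d/d)>0\). For \(\cN_{s,3,1,\textup{\romannumeral 2}}\) the cancellation property~\eqref{attn-init-prop-cyc-3} at the neuron \(r_{g_2\cdot y_0}\) of class \(j_1'\) forces \(V_{j_1',r_{g_2\cdot y_0}}(g_1)\approx -B\), so \(V-\Lambda\approx -4B/3\); the outer minus sign combined with the negative inner bracket makes \(\cN_{s,3,1,\textup{\romannumeral 2}}\) also positive of order \(\Theta(\log d/d)\). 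The term \(\cN_{s,3,1,\textup{\romannumeral 3}}\) is negligible because the sum of the off-target logits is \(O(1)\) while each is multiplied by \(\tilde O(\sigma_0^q)\).

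For \(\ell=2\), Lemma~\ref{lem-s21-logit-cyc} gives \(\logit_{5,j}=O(1/d)\) uniformly in \(j\), so \(1-\logit_{5,j_2}=1-O(1/d)\). By Lemma~\ref{lem-s21-act-cyc}(2a) and the sign of \((\attn_{\pred,2}+\attn_{\ans,1})-(\attn_{\pred,1}+\attn_{\ans,0})\) induced by Induction~\ref{induction-s21-cyc}(a)--(b), the pre-activation at the unique active neuron \(r_{g_2\cdot y_1}\) of class \(j_2\) satisfies \(\Lambda_{5,j_2,r_{g_2\cdot y_1}}\ge 0\), so \(\ReLU'(\Lambda)\in[0,1]\). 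Since \(V_{j_2,r_{g_2\cdot y_1}}(g_2)-\Lambda=B\pm O(1)>0\), the term \(\cN_{s,3,2,\textup{\romannumeral 1}}\) is non-negative and bounded by \(O(\log d/d)\). The terms \(\cN_{s,3,2,\textup{\romannumeral 2}}\) and \(\cN_{s,3,2,\textup{\romannumeral 3}}\) are \(O(n_y\log d/d^2)\) and \(O(\sigma_0^q)\) respectively, by using \(\logit=O(1/d)\) and summing over the \(O(n_y)\) activated classes. Finally, the corrections from \(\cE_1^c\) and \(\cE_2^c\) are each at most \(O(1/\log d)\cdot O(\log d/d)=O(1/d)=o(\log d/d)\) using the uniform bounds in Lemmas~\ref{lem-s21-act-other-cyc}--\ref{lem-s21-logit-other-cyc}. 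Summing, the lower bound from \(\cN_{s,3,1,\textup{\romannumeral 1}}+\cN_{s,3,1,\textup{\romannumeral 2}}\) dominates and the upper bound from all terms is \(O(\log d/d)\), yielding the claimed \(\Theta(\log d/d)\).

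The main obstacle I anticipate is the sign analysis for the \(\ell=2\) contribution: while the positive sign of \(\Lambda_{5,j_2,r_{g_2\cdot y_1}}\) follows from Induction~\ref{induction-s21-cyc} and the attention ordering in Lemma~\ref{lem-s21-attn-cyc}, I must verify that no ``hidden'' negative contribution (for instance from \(r\in\hat\fA_{j_2}\setminus\{r_{g_2\cdot y_1}\}\) where \(\srelu'=-\varpi\) under Assumption~\ref{assump:modify-relu}) can flip the sign of the sum. This will be handled by the quantitative bound \(\varpi=o(1/d)\) together with the fact that such neurons contribute at most \(O(m\varpi B/d)=o(\log d/d)\). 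The rest of the proof is then a mechanical combination of the conditional estimates above.
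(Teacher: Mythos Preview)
Your proposal is correct and follows essentially the same approach as the paper: both decompose the gradient into the six $\cN_{s,3,\ell,\kappa}$ pieces, establish that $\cN_{s,3,1,\textup{i}}$ and $\cN_{s,3,1,\textup{ii}}$ on $\cE_1$ each contribute $\Theta(\log d/d)$ (using that $V_{j_1,r_{g_1\cdot y_0}}(g_1)\approx B$ while $V_{j'_1,r_{g_2\cdot y_0}}(g_1)\approx -B$, together with the attention and logit estimates), bound the $\ell=2$ pieces by $O(\log d/d)$ above and $-O(1/d)$ below, and absorb the complement events $\cE_1^c,\cE_2^c$ as $O(1/d)$ corrections. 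Your anticipated obstacle about negative $\srelu'$-contributions from the non-main neurons is correctly dispatched by the tininess of $\varpi$, exactly as the paper implicitly does via the $\pm\tilde O(\delta^q)$ terms in Lemma~\ref{lem-refined-grad-Q43}.
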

\begin{proof}
        By gradient decompositions, we have
    \begin{align*}
           & \sum_{\ell=1}^2 \Big[-\nabla_{\Q^{(t)}_{4,3}}\Loss^{2,\ell}_{5}\Big]_{s,s}= \sum_{\ell\in [2]}\sum_{\kappa\in \{\rom1,\rom2,\rom3\}}\cN^{(t)}_{s,3,\ell,\kappa}.
    \end{align*}
  By \Cref{lem-s21-logit-cyc} and \Cref{lem-s21-logit-other-cyc}, it is straightforward to see that $|\cN^{(t)}_{s,3,1,\rom3}|$, $|\cN^{(t)}_{s,3,2,\rom3}|=O(\frac{1}{\poly d})$, and thus we can focus on other terms. 

  By \Cref{lem-s21-act-cyc} and \Cref{lem-s21-act-other-cyc}, we have
    \begin{align*}
       \cN^{(t)}_{s,3,1,\rom1}&= \E\Bigg[
    \attn^{(t)}_{{\ans,0} \rightarrow \pred,1} \cdot (1-\logit^{(t)}_{5,j_1})\cdot \\
    &~~~~~~~~~  \Big(
 \Big( V_{j_1,  r_{g_1\cdot y_0}}(g_1)- \Lambda^{(t)}_{5,j_1, r_{g_1\cdot y_0}}\pm\tilde{O}(\sigma_0) \Big)\pm \tilde{O}(\delta^{q}) \Big) 
    \1_{\{\tau(x_0)=s\}\cap \cE_1}\Bigg]\\
    &+ \E\Bigg[
    \attn^{(t)}_{{\ans,0} \rightarrow \pred,1} \cdot (1-\logit^{(t)}_{5,j_1})\cdot \\
    &~~~~~~~~~  \Big(
 \Big( V_{j_1,  r_{g_1\cdot y_0}}(g_1)- \Lambda^{(t)}_{5,j_1, r_{g_1\cdot y_0}}\pm\tilde{O}(\sigma_0) \Big)\pm \tilde{O}(\delta^{q}) \Big) 
    \1_{\{\tau(x_0)=s\}\cap \cE^c_1}\Bigg]\\
    &\stackrel{(a)}{=}\Theta(\frac{1}{d})\cdot \Omega(1)\cdot \Theta(B)+ \Theta(\frac{1}{d})\cdot O(\frac{1}{\poly d})\cdot  \Theta(B)\cdot O(\frac{1}{\log d})\\
    &=\Theta\Big(\frac{\log d}{d}\Big),
\end{align*}
where (a) follows from  \Cref{lem-s21-lambda-cyc}, \Cref{lem-s21-logit-cyc}, \Cref{lem-s21-act-other-cyc} and \Cref{lem-s21-logit-other-cyc}, and the fact that $\tau(x_0)=s$ holds with probability $\frac{1}{d}$. $\cN^{(t)}_{s,3,2,\rom1}$ can be upper bounded similarly. 

Moving to $\cN^{(t)}_{s,3,1,\rom2}$, noticing that $V_{j'_1,r_{g_2\cdot y_0}}(g_1)=-B+O(\delta)$ on $\cE_1$,  we have
    \begin{align*}
       \cN^{(t)}_{s,3,1,\rom2}&= -\E\Bigg[
    \attn^{(t)}_{{\ans,0} \rightarrow \pred,1} \cdot \logit^{(t)}_{5,j'_1} \cdot \\
    &~~~~~~~~~~\Big(  \Big( V_{j'_1,r_{g_2\cdot y_0}}(g_1)- \Lambda^{(t)}_{5,j'_1,r_{g_2\cdot y_0}}\pm\tilde{O}(\sigma_0) \Big)\pm \tilde{O}(\delta^{q}) \Big) 
\1_{\{\tau(x_0)=s\}\cap \cE_1}\Bigg]\\
& -\E\Bigg[
    \attn^{(t)}_{{\ans,0} \rightarrow \pred,1} \cdot\sum_{y\neq y_0\in\cY}\logit^{(t)}_{5,\tau(g_1(y))} \cdot \\
    &~~~~~~~~~~\Big( \Big( V_{\tau(g_1(y)), r_{g_1\cdot y}}(g_1)- \Lambda^{(t)}_{5,\tau(g_1(y)), r_{g_1\cdot y}}\pm\tilde{O}(\sigma_0) \Big)\pm \tilde{O}(\delta^{q}) \Big) 
\1_{\{\tau(x_0)=s\}\cap \cE^c_1}\Bigg]\\
&= \Theta(\frac{1}{d})\cdot \Omega(1)\cdot \Theta(B)- \Theta(\frac{1}{d})\cdot O(\frac{1}{\poly d})\cdot  \Theta(B)\cdot O(\frac{1}{\log d})\\
    &=\Theta\Big(\frac{\log d}{d}\Big).
\end{align*}
For $\cN^{(t)}_{s,3,2,\rom2}$, we only need to control the negative gradient, since the positive part can be easily upper bounded by 
 $O\Big(\frac{\log d}{d}\Big).$   \begin{align*}
       \cN^{(t)}_{s,3,2,\rom2}&\geq -\Theta(\frac{1}{d})\cdot O(\frac{1}{d})\cdot \Theta(B) 
 -\E\Bigg[
    \attn^{(t)}_{{\ans,1} \rightarrow \pred,2} \cdot\sum_{y\neq y_1\in \cY}\logit^{(t)}_{5,\tau(g_2(y))} \cdot \\
    &~~\Big( \Big( V_{\tau(g_2(y)), r_{g_2\cdot y}}(g_2)- \Lambda_{5,j,r}\pm\tilde{O}(\sigma_0) \Big)\pm \tilde{O}(\delta^{q}) \Big) 
\1_{\{\tau(x_0)=s, g_1=g_2, y_0\neq y_1\}}\Bigg]\\
&\stackrel{(a)}{\geq} -\Theta(\frac{1}{d})\cdot O(\frac{1}{d})\cdot \Theta(B)- \Theta(\frac{1}{d})\cdot O(1)\cdot  \Theta(B)\cdot O(\frac{1}{\log d})\\
    &\geq -O\Big(\frac{1}{d}\Big),
\end{align*}
where (a) is due to the fact that $\logit^{(t)}_{5,\tau(g_2(y_0))}=\Omega(1)$ and $\logit^{(t)}_{5,\tau(g_2(y))}=O(\frac{1}{d})$ for other $y\neq y_1, y_0$. Putting everything together, we complete the proof.
\end{proof}
\begin{lemma}[Negative gradient]\label{lem-s21-gd2-cyc}
    If \Cref{induction-s21-cyc} holds for all iterations $<t$, given $s\in\tau(\X)$,   we have
        \begin{align*}
        \sum_{\ell=1}^2  \Big[-\nabla_{\Q^{(t)}_{4,4}}\Loss^{2,\ell}_{5}\Big]_{s,s} \geq -O\Big(\frac{1}{\log d}\Big) \sum_{\ell=1}^2 \Big[-\nabla_{\Q^{(t)}_{4,3}}\Loss^{2,\ell}_{5}\Big]_{s,s}.
    \end{align*}

\end{lemma}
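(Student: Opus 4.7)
The plan is to reuse the decomposition from Lemma \ref{lem-s21-gd1-cyc},
$[-\nabla_{\Q^{(t)}_{4,4}}\Loss^{2,\ell}_{5}]_{s,s}=\cN^{(t)}_{s,4,\ell,\rom1}+\cN^{(t)}_{s,4,\ell,\rom2}+\cN^{(t)}_{s,4,\ell,\rom3}$, and exploit the fact that the $V$-readouts entering the $\Qb_{4,4}$ gradient are \emph{value} tokens shared across the competing classes. This creates cancellation between the dominant pieces of $\rom1$ and $\rom2$, whereas for $\Qb_{4,3}$ the analogous readouts are \emph{action} tokens with opposite signs (by \eqref{attn-init-prop-cyc-3}), so they reinforce. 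The $\rom3$ terms are again $O(1/\poly d)$ by Lemmas \ref{lem-s21-logit-cyc} and \ref{lem-s21-logit-other-cyc}, so only the $\rom1,\rom2$ pairings need to be controlled.

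For $\ell=1$ on $\cE_1\cap\{\tau(x_0)=s\}$, Lemma \ref{lem-s21-act-cyc} collapses $\cN^{(t)}_{s,4,1,\rom1}$ to the single neuron $r_{g_1\cdot y_0}$ at class $j_1$ and the dominant piece of $\cN^{(t)}_{s,4,1,\rom2}$ to $r_{g_2\cdot y_0}$ at $j_1':=\tau(g_2(y_0))$. Both activated neurons read the same value $y_0$, and since $(g_1,y_0)\in\fF_{j_1}$ and $(g_2,y_0)\in\fF_{j_1'}$, \eqref{attn-init-prop-cyc-1} gives $V_{j_1,r_{g_1\cdot y_0}}(y_0),V_{j_1',r_{g_2\cdot y_0}}(y_0)=B\pm O(\delta)$. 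Combining with $1-\logit^{(t)}_{5,j_1}-\logit^{(t)}_{5,j_1'}=O(1/\poly d)$ from Lemma \ref{lem-s21-logit-cyc} and $|\Lambda^{(t)}_{5,j_1,r_{g_1\cdot y_0}}-\Lambda^{(t)}_{5,j_1',r_{g_2\cdot y_0}}|\leq O(1)$ from Lemma \ref{lem-s21-lambda-cyc}(b) yields $\cN^{(t)}_{s,4,1,\rom1}+\cN^{(t)}_{s,4,1,\rom2}\geq -\Theta(1/d)\bigl(O(B/\poly d)+\Omega(1)\bigr)\geq -O(1/d)$; the $\cE_1^c$ contribution is $O(1/\poly d)$ by Lemma \ref{lem-s21-logit-other-cyc} since $1-\logit^{(t)}_{5,j_1}=O(1/\poly d)$ there.

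For $\ell=2$, all relevant logits on $\cE_2$ are $O(1/d)$. The leading positive term $\cN^{(t)}_{s,4,2,\rom1}$ at $(j_2,r_{g_2\cdot y_1})$ pairs with the $j=\tau(g_1(y_1))$ summand of $\cN^{(t)}_{s,4,2,\rom2}$ at $r_{g_1\cdot y_1}$: both activated neurons contain $y_1$ and read $V(y_1)\approx B$ with near-complementary coefficients, giving the same $O(1/d)$ cancellation. The other summands of $\cN^{(t)}_{s,4,2,\rom2}$ involve $\tau(g_\ell(y_0))$, whose activated neurons satisfy $V(y_1)\approx -B$ by \eqref{attn-init-prop-cyc-2}, so they enter with the same sign as $\cN^{(t)}_{s,4,2,\rom1}$ and can only help the lower bound. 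Together with an $O(1/\poly d)$ bound on the $\cE_2^c$ contribution, this gives $\sum_\kappa\cN^{(t)}_{s,4,2,\kappa}\geq -O(1/d)$. Summing over $\ell$ and comparing with the $\Theta(\log d/d)$ bound from Lemma \ref{lem-s21-gd1-cyc} delivers the claimed $O(1/\log d)$ ratio.

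The main technical obstacle is the $\ell=2$ case, where $\ReLU'(\Lambda^{(t)}_{5,j,r_{g_\ell\cdot y_{\ell'}}})$ starts in the smooth regime and transitions into the linear regime as $[\Qb_{4,3}]_{s,s}$ grows toward $\Omega(1/\log d)$; one has to verify the above pairing cancellation holds \emph{uniformly} across this transition. The key tool here is Lemma \ref{lem-s21-attn-cyc}(d), which bounds the residual asymmetry $|\attn^{(t)}_{\ans,1\to\pred,1}-\attn^{(t)}_{\ans,1\to\ans,0}|$ that governs how rapidly the $\Lambda$'s at $(j_2,r_{g_2\cdot y_1})$ and $(\tau(g_1(y_1)),r_{g_1\cdot y_1})$ can drift apart. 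Once this uniformity is established, the remainder is a routine term-by-term estimation using the attention and logit bounds collected in Section \ref{sec-s21-attn-cyc}.
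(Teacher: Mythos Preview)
Your $\ell=1$ analysis is correct and matches the paper's approach exactly: both use that $V_{j_1,r_{g_1\cdot y_0}}(y_0)$ and $V_{j_1',r_{g_2\cdot y_0}}(y_0)$ are $B\pm O(\delta)$, together with $\logit^{(t)}_{5,j_1'}=1-\logit^{(t)}_{5,j_1}\pm O(1/\poly d)$ and the $\Lambda$-gap bound from \Cref{lem-s21-lambda-cyc}(b), to get $\cN^{(t)}_{s,4,1,\rom1}+\cN^{(t)}_{s,4,1,\rom2}\geq -O(1/d)$.

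Your $\ell=2$ analysis, however, contains two genuine errors. First, the ``near-complementary coefficients'' claim is false on $\cE_2$: by \Cref{lem-s21-logit-cyc}, all logits satisfy $\logit^{(t)}_{5,j}=O(1/d)$ there, so the coefficient in $\cN^{(t)}_{s,4,2,\rom1}$ is $(1-\logit^{(t)}_{5,j_2})\approx 1$ while the coefficient in front of the $j=\tau(g_1(y_1))$ summand of $\cN^{(t)}_{s,4,2,\rom2}$ is $\logit^{(t)}_{5,j}=O(1/d)$. These differ by a factor of $d$; there is no cancellation of the $\ell=1$ type. Second, your $O(1/\poly d)$ bound on the $\cE_2^c$ contribution is too strong. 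By \Cref{lem-s21-logit-other-cyc}, on the sub-event $\{g_1\neq g_2,\,y_0=y_1\}\subset\cE_2^c$ the logit $\logit^{(t)}_{5,\tau(g_1(y_1))}$ is $\Omega(1)$, and the corresponding neuron $r_{g_1\cdot y_1}$ has $V(y_1)\approx B$, $\Lambda\approx B/2$, yielding a strictly negative $\rom2$-contribution of order $\Theta(1/d)\cdot\Pr(\cE_2^c)\cdot\Theta(B)=\Theta(1/d)$. This is the actual bottleneck for $\ell=2$.

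The paper's route is simpler and sidesteps both issues: one observes directly that $\cN^{(t)}_{s,4,2,\rom1}\geq 0$ (since $V_{j_2,r_{g_2\cdot y_1}}(y_1)-\Lambda^{(t)}_{5,j_2,r_{g_2\cdot y_1}}\geq 0$ whenever the neuron is activated), and then bounds $\cN^{(t)}_{s,4,2,\rom2}$ from below by $-\Theta(1/d)\cdot O(1/d)\cdot\Theta(B)-\Theta(1/d)\cdot O(1)\cdot\Theta(B)\cdot O(1/\log d)\geq -O(\cN^{(t)}_{s,3,1,\rom1}/\log d)$ using only the logit bounds on $\cE_2$ and $\cE_2^c$. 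No pairing is needed, and consequently your worry about ``uniformity of cancellation across the smooth-to-linear transition'' is misplaced: since $\rom1\geq 0$ holds pointwise, the transition regime creates no difficulty.
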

\begin{proof}
        By gradient decompositions, we have
    \begin{align*}
           & \sum_{\ell=1}^2 \Big[-\nabla_{\Q^{(t)}_{4,4}}\Loss^{2,\ell}_{5}\Big]_{s,s}= \sum_{\ell\in [2]}\sum_{\kappa\in \{\rom1,\rom2,\rom3\}}\cN^{(t)}_{s,4,\ell,\kappa}.
    \end{align*}
   Similarly as \Cref{lem-s21-gd1-cyc}, $|\cN^{(t)}_{s,4,1,\rom3}|$, $|\cN^{(t)}_{s,4,2,\rom3}|=O(\frac{1}{\poly d})$, and thus we can focus on other terms. By \Cref{lem-s21-act-cyc}, \Cref{lem-s21-lambda-cyc}, and \Cref{lem-s21-act-other-cyc}, we have
   \begin{align}
    \cN^{(t)}_{s,4,1,\rom1}+ \cN^{(t)}_{s,4,1,\rom2}&= \E\Bigg[
    \attn^{(t)}_{{\ans,0} \rightarrow \ans,0} \cdot (1-\logit^{(t)}_{5,j_1})\cdot \notag\\
    &~~~~~~~~~  \Big(
 \Big( V_{j_1,  r_{g_1\cdot y_0}}(y_0)- \Lambda^{(t)}_{5,j_1, r_{g_1\cdot y_0}}\pm\tilde{O}(\sigma_0) \Big)\pm \tilde{O}(\delta^{q}) \Big) 
    \1_{\{\tau(x_0)=s\}\cap \cE_1}\Bigg] \label{eq-s21-gd2-1}\\
    &+ \E\Bigg[
    \attn^{(t)}_{{\ans,0} \rightarrow \ans,0} \cdot (1-\logit^{(t)}_{5,j_1})\cdot\notag \\
    &~~~~~~~~~  \Big(
 \Big( V_{j_1,  r_{g_1\cdot y_0}}(y_0)- \Lambda^{(t)}_{5,j_1, r_{g_1\cdot y_0}}\pm\tilde{O}(\sigma_0) \Big)\pm \tilde{O}(\delta^{q}) \Big) 
    \1_{\{\tau(x_0)=s\}\cap \cE^c_1}\Bigg]\label{eq-s21-gd2-2}\\
   & -\E\Bigg[
    \attn^{(t)}_{{\ans,0} \rightarrow \ans,0} \cdot \logit^{(t)}_{5,j'_1} \cdot \notag\\
    &~~~~~~~~~~\Big(  \Big( V_{j'_1,r_{g_2\cdot y_0}}(y_0)- \Lambda^{(t)}_{5,j'_1,r_{g_2\cdot y_0}}\pm\tilde{O}(\sigma_0) \Big)\pm \tilde{O}(\delta^{q}) \Big) 
\1_{\{\tau(x_0)=s\}\cap \cE_1}\Bigg]\label{eq-s21-gd2-3}\\
& -\E\Bigg[
    \attn^{(t)}_{{\ans,0} \rightarrow \ans,0} \cdot\sum_{g\neq g_1\in\cG}\logit^{(t)}_{5,\tau(g(y_0))} \cdot \notag\\
    &~~~~~~~~~~\Big( \Big( V_{\tau(g(y_0)), r_{g\cdot y_0}}(y_0)- \Lambda^{(t)}_{5,\tau(g(y_0)), r_{g\cdot y_0}}\pm\tilde{O}(\sigma_0) \Big)\pm \tilde{O}(\delta^{q}) \Big) 
\1_{\{\tau(x_0)=s\}\cap \cE^c_1}\Bigg].\label{eq-s21-gd2-4}
   \end{align}
   First, considering the event $\{\tau(x_0)=s\}\cap \cE_1$, we have
       \begin{align*}
        &\eqref{eq-s21-gd2-1}+\eqref{eq-s21-gd2-3}\\
            &=  \E\Bigg[
            \attn^{(t)}_{{\ans,0} \rightarrow \ans,0} \cdot \bigg( (1-\logit^{(t)}_{5,j_1})\cdot \Big( V_{j_1,  r_{g_1\cdot y_1}}(y_0)- \Lambda^{(t)}_{5,j_1,r_{g_1\cdot y_0}} \pm \tilde{O}(\sigma_0)\pm \tilde{O}(\delta^q)\Big)\\
            &~~~~~~~~~~~~~~~~~~~~~~-\logit^{(t)}_{5,j_1^{\prime}}\cdot \Big(V_{j'_1,  r_{g_2\cdot y_0}}(y_0)- \Lambda^{(t)}_{5,j_1^{\prime},  r_{g_2\cdot y_0}}\pm \tilde{O}(\sigma_0)\pm \tilde{O}(\delta^q)\Big)
    \bigg)\1_{\{\tau(x_0)=s\}\cap \cE_1}\Bigg]\\
    &\stackrel{(a)}{=}  \E\Bigg[
            \attn^{(t)}_{{\ans,0} \rightarrow \ans,0} \cdot \bigg( (1-\logit^{(t)}_{5,j_1})\cdot   \Big( V_{j_1,  r_{g_1\cdot y_1}}(y_0)- \Lambda^{(t)}_{5,j_1,r_{g_1\cdot y_0}} \pm \tilde{O}(\sigma_0)\pm \tilde{O}(\delta^q)\Big)\\
            &~~~~~-(1-\logit^{(t)}_{5,j_1}-\frac{1}{\poly d})\cdot  \Big(V_{j'_1,  r_{g_2\cdot y_0}}(y_0)- \Lambda^{(t)}_{5,j_1^{\prime},  r_{g_2\cdot y_0}}\pm \tilde{O}(\sigma_0)\pm \tilde{O}(\delta^q)\Big)
    \bigg)\1_{\{\tau(x_0)=s\}\cap \cE_1}\Bigg]\\
    &\geq -\E\Bigg[
            \attn^{(t)}_{{\ans,0} \rightarrow \ans,0} \cdot \bigg( (1-\logit^{(t)}_{5,j_1})\cdot \\
            &~~~~~~~~~  \Big( V_{j_1,  r_{g_1\cdot y_1}}(y_0)-V_{j'_1,  r_{g_2\cdot y_0}}(y_0)+ \Lambda^{(t)}_{5,j_1^{\prime},  r_{g_2\cdot y_0}}- \Lambda^{(t)}_{5,j_1,r_{g_1\cdot y_0}} \pm \tilde{O}(\sigma_0)\pm \tilde{O}(\delta^q)\Big)
    \bigg)\1_{\{\tau(x_0)=s\}\cap \cE_1}\Bigg]\\
    &\stackrel{(b)}{\geq} -\E\Bigg[
            \attn^{(t)}_{{\ans,0} \rightarrow \ans,0} \cdot \bigg( (1-\logit^{(t)}_{5,j_1})\cdot \Big( O(\delta)+ O(1)\pm \tilde{O}(\sigma_0)\pm \tilde{O}(\delta^q)\Big)
    \bigg)\1_{\{\tau(x_0)=s\}\cap \cE_1}\Bigg]\\
    &\geq -O\Big(\frac{\cN^{(t)}_{s,3,1,\rom1}}{\log d}\Big),
        \end{align*}
        where (a) follows from \Cref{lem-s21-logit-cyc}; (b) follows from \Cref{lem-s21-lambda-cyc} and \Cref{lem-prop-psi-cyc}. 
  
Notice that $\eqref{eq-s21-gd2-2}\geq 0$ since $V_{j_1,r_{g_1\cdot y_0}}(y_0)- \Lambda^{(t)}_{5,j_1, r_{g_1\cdot y_0}}\geq \Omega(B)$, thus we just need to consider the possible negative gradient from \eqref{eq-s21-gd2-4}.  By \Cref{lem-s21-logit-other-cyc}, we have $\sum_{g\neq g_1\in\cG}\logit^{(t)}_{5,\tau(g(y_0))}\leq O(\frac{1}{\poly d})$ on $\{\tau(x_0)=s\}\cap \cE^c_1$, and hence $\eqref{eq-s21-gd2-4}\ll \cN^{(t)}_{s,3,1,\rom1}$. 

Moving to the gradient from $\ell=2$, it is straightforward to see that $\cN^{(t)}_{s,4,2,\rom1}$ is non-negative, and thus we can focus on the possible negative gradient from $\cN^{(t)}_{s,4,2,\rom2}$. 
\begin{align*}
       \cN^{(t)}_{s,4,2,\rom2}&= -\E\Bigg[
    \attn^{(t)}_{{\ans,1} \rightarrow \ans,1} \cdot\sum_{j\neq j_2\in\tau(\cY)}\logit^{(t)}_{5,j} \cdot \\
    &~~~~~~~~~~\Big(\sum_{r\in\hat{\fA}_{j}}\ReLU^{\prime}(\Lambda^{(t)}_{5,j,r
    })\cdot  \Big( V_{j, r}(y_1)- \Lambda^{(t)}_{5,j,r}\pm\tilde{O}(\sigma_0) \Big)\pm \tilde{O}(\delta^{q}) \Big) 
\1_{\{\tau(x_1)=s\}\cap \cE_2}\Bigg]\\
&-\E\Bigg[
    \attn^{(t)}_{{\ans,1} \rightarrow \ans,1} \cdot\sum_{j\neq j_2\in\tau(\cY)}\logit^{(t)}_{5,j} \cdot \\
    &~~~~~~~~~~\Big(\sum_{r\in\hat{\fA}_{j}}\ReLU^{\prime}(\Lambda^{(t)}_{5,j,r
    })\cdot  \Big( V_{j, r}(y_1)- \Lambda^{(t)}_{5,j,r}\pm\tilde{O}(\sigma_0) \Big)\pm \tilde{O}(\delta^{q}) \Big) 
\1_{\{\tau(x_1)=s\}\cap \cE^c_2}\Bigg]\\
&\geq - \Theta(\frac{1}{d})\cdot O(\frac{1}{d})\cdot \Theta(B)- \Theta(\frac{1}{d})\cdot O(1)\cdot  \Theta(B)\cdot O(\frac{1}{\log d})\geq -O\Big(\frac{\cN^{(t)}_{s,3,1,\rom1}}{\log d}\Big).
\end{align*}
Putting everything together, and combining with the fact that $\sum_{\ell=1}^2 \Big[-\nabla_{\Q^{(t)}_{4,3}}\Loss^{2,\ell}_{5}\Big]_{s,s}= \Theta(\cN^{(t)}_{s,3,1,\rom1})$ from \Cref{lem-s21-gd1-cyc}, we complete the proof.
\end{proof}
\begin{lemma}[Growth
    of gap]\label{lem-s21-gd3-cyc}
    If \Cref{induction-s21-cyc} holds for all iterations $<t$, given $s\in\tau(\X)$,  we have
    \begin{align*}
        \sum_{\ell=1}^2 \Big[-\nabla_{\Q^{(t)}_{4,3}}\Loss^{2,\ell}_{5}\Big]_{s,s}+ \sum_{\ell=1}^2 \Big[\nabla_{\Q^{(t)}_{4,4}}\Loss^{2,\ell}_{5}\Big]_{s,s} \geq \Omega\bigg(\sum_{\ell=1}^2 \Big[-\nabla_{\Q^{(t)}_{4,3}}\Loss^{2,\ell}_{5}\Big]_{s,s} \bigg).
    \end{align*}

\end{lemma}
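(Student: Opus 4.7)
I would begin by rewriting the target inequality as
\begin{align*}
\sum_{\ell=1}^{2}\bigl[-\nabla_{\Q_{4,3}}\Loss_5^{2,\ell}\bigr]_{s,s}
-\sum_{\ell=1}^{2}\bigl[-\nabla_{\Q_{4,4}}\Loss_5^{2,\ell}\bigr]_{s,s}
\;\geq\;\Omega\!\Big(\sum_{\ell=1}^{2}\bigl[-\nabla_{\Q_{4,3}}\Loss_5^{2,\ell}\bigr]_{s,s}\Big),
\end{align*}
so the task reduces to the \emph{one-sided} bound $\sum_\ell[-\nabla_{\Q_{4,4}}\Loss_5^{2,\ell}]_{s,s}\leq O(1/\log d)\cdot \sum_\ell[-\nabla_{\Q_{4,3}}\Loss_5^{2,\ell}]_{s,s}$, an \emph{upper} complement to the lower bound established in \Cref{lem-s21-gd2-cyc}. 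Once this is in hand, combining with $\sum_\ell[-\nabla_{\Q_{4,3}}\Loss_5^{2,\ell}]_{s,s}=\Theta(\log d/d)$ from \Cref{lem-s21-gd1-cyc} yields the conclusion in one line.

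\textbf{Upper bound on the $\Qb_{4,4}$-gradient.} I would replay the decomposition $\sum_\ell[-\nabla_{\Q_{4,4}}\Loss_5^{2,\ell}]_{s,s}=\sum_{\ell,\kappa}\cN^{(t)}_{s,4,\ell,\kappa}$ from \Cref{lem-refined-grad-Q44} and bound each piece in magnitude. The $\cN^{(t)}_{s,4,\ell,\rom3}$ remainders and the distractor terms in $\cN^{(t)}_{s,4,1,\rom2}$ restricted to $\cE_1^{c}$ are immediately $O(1/\poly d)$ by \Cref{lem-s21-logit-cyc,lem-s21-logit-other-cyc}. The $\ell=2$ contribution is at most $\Theta(1/d)\cdot O(1/d)\cdot\Theta(B)+\Theta(1/d)\cdot O(1/\log d)\cdot \Theta(B)=O(1/d)$ because all logits at $\ell=2$ are $O(1/d)$ on $\cE_2$ and the wrong-prediction weight is only $O(1/\log d)$ off $\cE_2$. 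The delicate piece is $\cN^{(t)}_{s,4,1,\rom1}+\cN^{(t)}_{s,4,1,\rom2}$ on $\{\tau(x_0)=s\}\cap\cE_1$, where substituting $\logit^{(t)}_{5,j'_1}=1-\logit^{(t)}_{5,j_1}-1/\poly d$ from \Cref{lem-s21-logit-cyc} collapses the two terms into
\begin{align*}
\attn^{(t)}_{\ans,0\to\ans,0}\cdot(1-\logit^{(t)}_{5,j_1})\cdot\Bigl[V_{j_1,r_{g_1\cdot y_0}}(y_0)-V_{j'_1,r_{g_2\cdot y_0}}(y_0)+\Lambda^{(t)}_{5,j'_1,r_{g_2\cdot y_0}}-\Lambda^{(t)}_{5,j_1,r_{g_1\cdot y_0}}\Bigr]
\end{align*}
up to $1/\poly d$ corrections. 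By \Cref{lem-prop-psi-cyc} both $V_{j_1,r_{g_1\cdot y_0}}(y_0)$ and $V_{j'_1,r_{g_2\cdot y_0}}(y_0)$ equal $B\pm O(\delta)$, so their difference is $O(\delta)$; by \Cref{lem-s21-lambda-cyc}(b) the $\Lambda$-difference is $O(1)$. Hence the bracket is $O(1)$ and the entire piece is bounded by $\Theta(1)\cdot\Omega(1)\cdot\Theta(1/d)\cdot O(1)=O(1/d)$. Summing all contributions gives $\sum_\ell[-\nabla_{\Q_{4,4}}\Loss_5^{2,\ell}]_{s,s}\leq O(1/d)$, which is $O(1/\log d)\cdot \Theta(\log d/d)$ as needed.

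\textbf{Concluding the gap and main obstacle.} Subtracting the $O(1/d)$ upper bound from the $\Theta(\log d/d)$ lower bound yields $(1-O(1/\log d))\cdot\sum_\ell[-\nabla_{\Q_{4,3}}\Loss_5^{2,\ell}]_{s,s}=\Omega(\sum_\ell[-\nabla_{\Q_{4,3}}\Loss_5^{2,\ell}]_{s,s})$, completing the proof. The conceptual content — and the only nontrivial obstacle — is the cancellation just used: because $y_0$ is the genuinely observed value shared by both the correct neuron $r_{g_1\cdot y_0}$ of class $j_1$ and the confusing neuron $r_{g_2\cdot y_0}$ of class $j'_1$, the two $V(\cdot)(y_0)$ projections carry the \emph{same} positive sign and magnitude $B\pm O(\delta)$, so the $(1-\logit_{5,j_1})$-weighted positive term and the $\logit_{5,j'_1}$-weighted negative term annihilate the leading $\Theta(B)$-scale. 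This stands in sharp contrast with the $\Qb_{4,3}$-analysis, where the shared slot is $g_1$, and \eqref{attn-init-prop-cyc-3} forces $V_{j'_1,r_{g_2\cdot y_0}}(g_1)\approx -B$; there the two contributions \emph{add} constructively and the $\Theta(B)$-signal survives. The main quantitative care is verifying that the $1/\poly d$ residual mass in $1-\logit^{(t)}_{5,j_1}-\logit^{(t)}_{5,j'_1}$ cannot multiply a $\Theta(B)$-scale quantity to re-create a $\Theta(\log d/d)$ contribution; this is automatic once one observes that the post-cancellation bracket is already $O(1)$ rather than $\Theta(B)$, so any residual weight is safely absorbed into the $O(1/\poly d)$ error budget.
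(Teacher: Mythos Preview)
Your reduction to an upper bound $\sum_{\ell}[-\nabla_{\Q_{4,4}}\Loss_5^{2,\ell}]_{s,s}\le O(1/d)$ has a genuine gap: you drop the term $\cN^{(t)}_{s,4,2,\rom1}$ on $\cE_2$. That term carries the factor $(1-\logit_{5,j_2})\approx 1$, not $O(1/d)$, so your ``all logits at $\ell=2$ are $O(1/d)$'' justification does not apply to it. Concretely, once $[\Qb^{(t)}_{4,3}]_{s,s}\ge \Omega(\varrho/\log d)$ (which happens \emph{within} Stage~2.1, well before the $\Omega(1/\log d)$ threshold), the pre-activation $\Lambda^{(t)}_{5,j_2,r_{g_2\cdot y_1}}$ enters the linear regime on $\cE_2$, so $\ReLU'=1$ and
\[
\cN^{(t)}_{s,4,2,\rom1}\;\asymp\;\attn^{(t)}_{\ans,1\to\ans,1}\cdot 1\cdot\bigl(V_{j_2,r_{g_2\cdot y_1}}(y_1)-\Lambda\bigr)\cdot\Theta(1/d)\;=\;\Theta(\log d/d).
\]
This is exactly what \Cref{lem-s21-gd4-cyc} records, and it makes your claimed $O(1/d)$ upper bound false for the later portion of Stage~2.1. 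Since \Cref{lem-s21-gd3-cyc} must hold for all iterations under \Cref{induction-s21-cyc}, the strategy of bounding the $\Q_{4,4}$-gradient in isolation cannot work.

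The paper proceeds differently: it bounds the \emph{differences} $\cN^{(t)}_{s,3,\ell,\kappa}-\cN^{(t)}_{s,4,\ell,\kappa}$ term by term. For the $\rom1$ pieces this is where the key cancellation lives: since $V_{j_\ell,r}(g_\ell)=V_{j_\ell,r}(y_{\ell-1})\pm O(\delta)$ and $\attn_{\ans,\ell-1\to\pred,\ell}\ge \attn_{\ans,\ell-1\to\ans,\ell-1}$, one gets $\cN^{(t)}_{s,3,\ell,\rom1}-\cN^{(t)}_{s,4,\ell,\rom1}\ge -O(\delta/d)$ regardless of whether $\Lambda$ is in the linear regime. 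The $\Omega(\log d/d)$ positive signal then comes entirely from $\cN^{(t)}_{s,3,1,\rom2}-\cN^{(t)}_{s,4,1,\rom2}$, where the asymmetry $V_{j'_1,r_{g_2\cdot y_0}}(g_1)\approx -B$ versus $V_{j'_1,r_{g_2\cdot y_0}}(y_0)\approx +B$ makes the two $\rom2$ contributions add constructively rather than cancel. Your cancellation for $\cN_{s,4,1,\rom1}+\cN_{s,4,1,\rom2}$ is essentially the content of \Cref{lem-s21-gd2-cyc} and is fine on its own, but it does not help you here: the needed cancellation is between $\cN_{s,3,2,\rom1}$ and $\cN_{s,4,2,\rom1}$, not within the $\Q_{4,4}$-gradient.
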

\begin{proof}
    By gradient decompositions in \eqref{eq-def-N-s-3-1-1}-\eqref{eq-def-N-s-4-2-3}, we have
    \begin{align*}
           & \sum_{\ell=1}^2 \Big[-\nabla_{\Q^{(t)}_{4,3}}\Loss^{2,\ell}_{5}\Big]_{s,s}+ \sum_{\ell=1}^2 \Big[\nabla_{\Q^{(t)}_{4,4}}\Loss^{2,\ell}_{5}\Big]_{s,s}= \sum_{\ell\in [2]}\sum_{\kappa\in \{\rom1,\rom2,\rom3\}}\cN^{(t)}_{s,3,\ell,\kappa}-\cN^{(t)}_{s,4,\ell,\kappa}.
    \end{align*}
   Due to  \Cref{lem-s21-logit-cyc} and \Cref{lem-s21-logit-other-cyc}, $|\cN^{(t)}_{s,p,\ell,\rom3}|=O(\frac{1}{\poly d})$ for $p\in\{3,4\}$ and $\ell\in[2]$, we can focus on the gradient difference between $\Q^{(t)}_{4,3}$ and $\Q^{(t)}_{4,4}$ contributed by other terms. 

   By \Cref{lem-s21-attn-cyc}, we have 
   \begin{align*}
  \attn^{(t)}_{{\ans,0} \rightarrow \ans,0}\leq  \attn^{(t)}_{{\ans,0} \rightarrow \pred,1}, \quad   \attn^{(t)}_{{\ans,1} \rightarrow \ans,1}\leq  \attn^{(t)}_{{\ans,1} \rightarrow \pred,2}.
   \end{align*}
   Hence, for $\ell\in [2]$, we have
   \begin{align*}
    \cN^{(t)}_{s,3,\ell,\rom1}- \cN^{(t)}_{s,4,\ell,\rom1}\geq -O(\delta)\cdot O(\frac{1}{d})\cdot \Theta(1)\geq -O(\delta/d).
   \end{align*}
    \begin{align*}
     & \cN^{(t)}_{s,3,1,\rom2}-\cN^{(t)}_{s,4,1,\rom2}\\
     &= -\E\Bigg[
    \attn^{(t)}_{{\ans,0} \rightarrow \pred,1} \cdot \logit^{(t)}_{5,j'_1} \cdot \\
    &~~~~~~~~~~\Big(  \Big( V_{j'_1,r_{g_2\cdot y_0}}(g_1)- \Lambda^{(t)}_{5,j'_1,r_{g_2\cdot y_0}}\pm\tilde{O}(\sigma_0) \Big)\pm \tilde{O}(\delta^{q}) \Big) 
\1_{\{\tau(x_0)=s\}\cap \cE_1}\Bigg]\\
&+\E\Bigg[
    \attn^{(t)}_{{\ans,0} \rightarrow \ans,0} \cdot \logit^{(t)}_{5,j'_1} \cdot \\
    &~~~~~~~~~~\Big(  \Big( V_{j'_1,r_{g_2\cdot y_0}}(y_0)- \Lambda^{(t)}_{5,j'_1,r_{g_2\cdot y_0}}\pm\tilde{O}(\sigma_0) \Big)\pm \tilde{O}(\delta^{q}) \Big) 
\1_{\{\tau(x_0)=s\}\cap \cE_1}\Bigg]\\
& -\E\Bigg[
    \attn^{(t)}_{{\ans,0} \rightarrow \pred,1} \cdot\sum_{y\neq y_0\in\cY}\logit^{(t)}_{5,\tau(g_1(y))} \cdot \\
    &~~~~~~~~~~\Big( \Big( V_{\tau(g_1(y)), r_{g_1\cdot y}}(g_1)- \Lambda^{(t)}_{5,\tau(g_1(y)), r_{g_1\cdot y}}\pm\tilde{O}(\sigma_0) \Big)\pm \tilde{O}(\delta^{q}) \Big) 
\1_{\{\tau(x_0)=s\}\cap \cE^c_1}\Bigg]\\
& +\E\Bigg[
    \attn^{(t)}_{{\ans,0} \rightarrow \ans,0} \cdot\sum_{y\neq y_0\in\cY}\logit^{(t)}_{5,\tau(g_1(y))} \cdot \\
    &~~~~~~~~~~\Big( \Big( V_{\tau(g_1(y)), r_{g_1\cdot y}}(y_0)- \Lambda^{(t)}_{5,\tau(g_1(y)), r_{g_1\cdot y}}\pm\tilde{O}(\sigma_0) \Big)\pm \tilde{O}(\delta^{q}) \Big) 
\1_{\{\tau(x_0)=s\}\cap \cE^c_1}\Bigg]\\
&\stackrel{(a)}{\geq} \Theta(\frac{1}{d})\cdot \Omega(1)\cdot \Theta(B)- \Theta(\frac{1}{d})\cdot O(\frac{1}{\poly d})\cdot  O(B)\cdot O(\frac{1}{\log d})\geq \Omega\Big(\frac{\log d}{d}\Big),
\end{align*}
where (a) follows from \Cref{lem-prop-psi-cyc} that $V_{j'_1,r_{g_2\cdot y_0}}(g_1)- \Lambda^{(t)}_{5,j'_1,r_{g_2\cdot y_0}}\leq -\Omega(B)$, and $V_{j'_1,r_{g_2\cdot y_0}}(y_0)- \Lambda^{(t)}_{5,j'_1,r_{g_2\cdot y_0}}\geq \Omega(B)$.
\begin{align*}
       &\cN^{(t)}_{s,3,2,\rom2}-\cN^{(t)}_{s,4,2,\rom2}\\
       &= -\E\Bigg[
    \attn^{(t)}_{{\ans,1} \rightarrow \pred,2} \cdot\sum_{j\neq j_2\in\tau(\cY)}\logit^{(t)}_{5,j} \cdot \\
    &~~~~~~~~~~\Big(\sum_{r\in\hat{\fA}_{j}}\ReLU^{\prime}(\Lambda^{(t)}_{5,j,r
    })\cdot  \Big( V_{j, r}(g_2)- \Lambda^{(t)}_{5,j,r}\pm\tilde{O}(\sigma_0) \Big)\pm \tilde{O}(\delta^{q}) \Big) 
\1_{\{\tau(x_1)=s\}\cap \cE_2}\Bigg]\\
&+\E\Bigg[
    \attn^{(t)}_{{\ans,1} \rightarrow \ans,1} \cdot\sum_{j\neq j_2\in\tau(\cY)}\logit^{(t)}_{5,j} \cdot \\
    &~~~~~~~~~~\Big(\sum_{r\in\hat{\fA}_{j}}\ReLU^{\prime}(\Lambda^{(t)}_{5,j,r
    })\cdot  \Big( V_{j, r}(y_1)- \Lambda^{(t)}_{5,j,r}\pm\tilde{O}(\sigma_0) \Big)\pm \tilde{O}(\delta^{q}) \Big) 
\1_{\{\tau(x_1)=s\}\cap \cE_2}\Bigg]\\
&-\E\Bigg[
    \attn^{(t)}_{{\ans,1} \rightarrow \pred,2} \cdot\sum_{j\neq j_2\in\tau(\cY)}\logit^{(t)}_{5,j} \cdot \\
    &~~~~~~~~~~\Big(\sum_{r\in\hat{\fA}_{j}}\ReLU^{\prime}(\Lambda^{(t)}_{5,j,r
    })\cdot  \Big( V_{j, r}(g_2)- \Lambda^{(t)}_{5,j,r}\pm\tilde{O}(\sigma_0) \Big)\pm \tilde{O}(\delta^{q}) \Big) 
\1_{\{\tau(x_1)=s\}\cap \cE^c_2}\Bigg]\\
&+\E\Bigg[
    \attn^{(t)}_{{\ans,1} \rightarrow \ans,1} \cdot\sum_{j\neq j_2\in\tau(\cY)}\logit^{(t)}_{5,j} \cdot \\
    &~~~~~~~~~~\Big(\sum_{r\in\hat{\fA}_{j}}\ReLU^{\prime}(\Lambda^{(t)}_{5,j,r
    })\cdot  \Big( V_{j, r}(y_1)- \Lambda^{(t)}_{5,j,r}\pm\tilde{O}(\sigma_0) \Big)\pm \tilde{O}(\delta^{q}) \Big) 
\1_{\{\tau(x_1)=s\}\cap \cE^c_2}\Bigg]\\
&\stackrel{(a)}{\geq} - \Theta\Big(\frac{1}{d}\Big)\cdot O\Big(\frac{1}{d}\Big)\cdot \Theta(B)- \Theta\Big(\frac{1}{d}\Big)\cdot O(1)\cdot  \Theta(B)\cdot O\Big(\frac{1}{\log d}\Big)\geq -O\Big(\frac{1}{d}\Big),
\end{align*}
where (a) follows from \Cref{lem-s21-act-cyc} and \Cref{lem-s21-logit-cyc}, which together imply that on the event \( \cE_2 \), only a constant number of neurons are activated and \( \logit^{(t)}_{5,j} \leq O\left( \frac{1}{d} \right) \) for all \( j \neq j_2 \); and from \Cref{lem-s21-logit-other-cyc}, which implies that on the complement event \( \cE_2^c \), occurring with probability at most \( O\left( \frac{1}{\log d} \right) \), there exists at most one \( j \neq j_2 \) such that \( \logit^{(t)}_{5,j} \geq \Omega(1) \) while \( \logit^{(t)}_{5,j}\leq O\Big(\frac{1}{\poly d}\Big) \) for other $j\in\tau(\Y)$.

    Putting it all together, we finish the proof.
\end{proof}
\begin{lemma}\label{lem-s21-gd4-cyc}
    If \Cref{induction-s21-cyc} holds for all iterations $<t$, given $s\in\tau(\X)$,  for $[\Qb^{(t)}_{4,3}]_{s,s}\geq \Omega(\frac{\varrho}{\log d})$, we have
    \begin{align*}
\sum_{\ell=1}^2 \Big[-\nabla_{\Q^{(t)}_{4,4}}\Loss^{2,\ell}_{5}\Big]_{s,s} \geq \Omega\bigg(\sum_{\ell=1}^2 \Big[-\nabla_{\Q^{(t)}_{4,3}}\Loss^{2,\ell}_{5}\Big]_{s,s} \bigg).
    \end{align*}

\end{lemma}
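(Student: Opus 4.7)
The plan is to exploit the fact that once $[\Qb^{(t)}_{4,3}]_{s,s}\ge \Omega(\varrho/\log d)$, the pre-activation $\Lambda^{(t)}_{5,j_2,r_{g_2\cdot y_1}}$ for the correct prediction at $\ell=2$ crosses the smooth-to-linear boundary of $\mathbf{sReLU}$. From that point on, the $\ell=2$ contribution to $-\nabla_{\Qb_{4,4}}\Loss^{2,\ell}_{5}$, which was previously suppressed by an $O(\delta^{q-1})$ activation derivative (see \Cref{lem-s21-act-cyc} case (a) for $\ell=2$), inflates to order $\Theta(1)$ and becomes the dominant term driving the growth of $[\Qb_{4,4}]_{s,s}$. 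The proof then amounts to showing $\cN^{(t)}_{s,4,2,\rom1} = \Omega(\log d/d)$, while the negative terms are absorbed exactly as in \Cref{lem-s21-gd2-cyc}.

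First, I would estimate the attention differences under \Cref{induction-s21-cyc}. Because the off-diagonal entries stay $O([\Qb^{(t)}_{4,3}]_{s,s}/d)$ by part (c) of the induction, the gap $\attn^{(t)}_{\ans,1\to\pred,2}-\attn^{(t)}_{\ans,1\to\pred,1}$ (and similarly $\attn^{(t)}_{\ans,1\to\pred,2}-\attn^{(t)}_{\ans,1\to\ans,0}$) is of order $\Theta([\Qb^{(t)}_{4,3}]_{s,s})$. Substituting into \Cref{lem-lambda-char} together with the cancellation relations of \Cref{lem-prop-psi-cyc} yields
\[
    \Lambda^{(t)}_{5,j_2,r_{g_2\cdot y_1}} = \bigl(\attn_{\pred,2}+\attn_{\ans,1}-\attn_{\pred,1}-\attn_{\ans,0}\bigr)B \pm O(\delta) \;=\; \Omega\bigl([\Qb^{(t)}_{4,3}]_{s,s}\cdot B\bigr) \;=\; \Omega(\varrho),
\]
so $\mathbf{sReLU}'(\Lambda^{(t)}_{5,j_2,r_{g_2\cdot y_1}}) = 1$ lies in the linear region, rather than the $O(\delta^{q-1})$ value it had in stage 2.1.

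Next, I would lower-bound the main positive term $\cN^{(t)}_{s,4,2,\rom1}$ in \eqref{eq-def-N-s-4-2-1}. Combining the linear-regime activation above, the identity $V_{j_2,r_{g_2\cdot y_1}}(y_1) - \Lambda^{(t)}_{5,j_2,r_{g_2\cdot y_1}} = \Theta(B)$ from \Cref{lem-prop-psi-cyc}, the constant-order attention $\attn^{(t)}_{\ans,1\to\ans,1} = \Theta(1)$ from \Cref{lem-s21-attn-cyc}, and the event probability $\Theta(1/d)$ of $\{\tau(x_1)=s\}\cap \cE_2$, this evaluates to $\Omega(\log d/d)$. Crucially, the factor $(1-\logit^{(t)}_{5,j_2})$ remains $\Omega(1)$: a symmetric expansion gives $\Lambda^{(t)}_{5,\tau(g_2(y_0)),r_{g_2\cdot y_0}} = (\attn_{\pred,2}-\attn_{\ans,1}+\attn_{\ans,0}-\attn_{\pred,1})B \pm O(\delta)$, so the correct and confounding pre-activations differ only by $2(\attn_{\ans,1}-\attn_{\ans,0})B = O([\Qb^{(t)}_{4,4}]_{s,s}\cdot B) = o(1)$, and the confounding class $\tau(g_2(y_0))$ retains an $\Omega(1)$ share of the softmax mass. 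The remaining terms $\cN^{(t)}_{s,4,1,\rom1}+\cN^{(t)}_{s,4,1,\rom2}$, $\cN^{(t)}_{s,4,2,\rom2}$, and the $\cN^{(t)}_{s,4,\ell,\rom3}$ terms are all $O(1/d)$ by the same arguments as in \Cref{lem-s21-gd2-cyc} and \Cref{lem-s21-logit-cyc}, hence negligible. Combining with $\sum_\ell[-\nabla_{\Qb^{(t)}_{4,3}}\Loss^{2,\ell}_{5}]_{s,s} = \Theta(\log d/d)$ from \Cref{lem-s21-gd1-cyc} delivers the claimed inequality.

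The main obstacle will be quantifying the simultaneous linear-regime entry of the correct neuron $r_{g_2\cdot y_1}$ and the confounding neuron $r_{g_2\cdot y_0}$: when both activations are only of order $\varrho$, their gap $2(\attn_{\ans,1}-\attn_{\ans,0})B$ is small, and this is exactly what keeps $(1-\logit^{(t)}_{5,j_2}) = \Omega(1)$ so that $\cN^{(t)}_{s,4,2,\rom1}$ retains its full magnitude. Verifying that the confounding logit is neither so small (which would kill the $\Qb_{4,4}$ gradient and prevent growth) nor so large (which would violate the logit estimates sustaining the $\Qb_{4,3}$ bound) is the delicate part; the margin for balance is set by the $\Omega(\varrho/\log d)$ threshold in the hypothesis, and I would verify it by tracking both activations jointly through the same attention-difference expansion used above.
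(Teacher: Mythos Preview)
Your proposal is correct and follows essentially the same approach as the paper: once $[\Qb^{(t)}_{4,3}]_{s,s}\ge\Omega(\varrho/\log d)$, the attention imbalance pushes $\Lambda^{(t)}_{5,j_2,r_{g_2\cdot y_1}}$ past $\varrho$ into the linear regime on $\cE_2$, so $\cN^{(t)}_{s,4,2,\rom1}=\Omega(\log d/d)$, and the negative terms are absorbed via \Cref{lem-s21-gd2-cyc} exactly as you say.

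One simplification: your ``main obstacle'' paragraph and the argument that $(1-\logit^{(t)}_{5,j_2})=\Omega(1)$ via the confounding neuron $r_{g_2\cdot y_0}$ are unnecessary. The paper does not need to track both activations jointly; instead, \Cref{lem-s21-logit-cyc} already gives $\logit^{(t)}_{5,j}=O(1/d)$ for \emph{all} $j$ on $\cE_2$ (since every $F_{5,j}\le O(1)$ when the activations are of order $\varrho$, the softmax is spread over all $d$ classes). Hence $(1-\logit^{(t)}_{5,j_2})=1-O(1/d)$ trivially, with no delicate balance to verify.
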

\begin{proof}
    Notice that by \Cref{lem-s21-gd2-cyc}, when $[\Qb^{(t)}_{4,3}]_{s,s}\geq \Omega(\frac{\varrho}{\log d})$, we have $[\Qb^{(t)}_{4,4}]_{s,s}\geq -O(\frac{\varrho}{\log^2 d})$. Hence
    \begin{align*}
        \attn^{(t)}_{\ans,1\to\pred,2}+\attn^{(t)}_{\ans,1\to\ans,1}-\attn^{(t)}_{\ans,1\to\pred,1}-\attn^{(t)}_{\ans,1\to\ans,0} \geq \Omega\Big(\frac{\varrho}{\log d}\Big), 
    \end{align*}
    which implies $\Lambda^{(t)}_{5,j_2,r_{g_2\cdot y_1}}(\Zb^{2,1})\geq \Omega(\frac{\varrho}{\log d})\cdot B \pm O(\delta)\geq \varrho$ already lies in the linear regime for $\Zb^{2,1}\in \cE_2$. Then we have 
    \begin{align*}
        \cN^{(t)}_{s,4,2,\rom1} &\geq \E\Bigg[
            \attn^{(t)}_{{\ans,1} \rightarrow \ans,1} \cdot  \bigg( (1-\logit^{(t)}_{5,j_2})\cdot\\
            &~~~~~~~~~~~~~~~~~~~~~ \Big( V_{j_2,r_{g_2\cdot y_1}}- \Lambda^{(t)}_{5,j_2,r_{g_2\cdot y_1}} \pm \tilde{O}(\sigma_0)\Big)\pm \tilde{O}(\delta^q)\bigg)\1_{\{\tau(x_1)=s\}\cap\cE_2}\Bigg]\\
            &\geq \Omega(1)\cdot \Theta(B) \cdot \Theta\Big(\frac{1}{d}\Big)\geq \Omega\Big(\frac{\log d}{d}\Big).
    \end{align*}
    Moreover, from \Cref{lem-s21-gd2-cyc},   the magnitude of negative gradient from other $\cN$ terms can be upper bounded by $O\Big(\frac{\cN^{(t)}_{s,3,1,\rom1}}{\log d}\Big)$. Therefore, combining with the fact that $\sum_{\ell=1}^2 \Big[-\nabla_{\Q^{(t)}_{4,3}}\Loss^{2,\ell}_{5}\Big]_{s,s}= \Theta(\cN^{(t)}_{s,3,1,\rom1})$ from \Cref{lem-s21-gd1-cyc}, we complete the proof.
\end{proof}
\begin{lemma}\label{lem-s21-gd5}
    If \Cref{induction-s21-cyc} holds for all iterations $<t$, given $s\neq s' \in\tau(\X)$,  for $p\in\{3,4\}$, we have
    \begin{align*}
        \bigg|\sum_{\ell=1}^2 \Big[-\nabla_{\Q^{(t)}_{4,p}}\Loss^{2,\ell}_{5}\Big]_{s,s'}\bigg|\leq  O\Big(\frac{\log d}{d^2}\Big)=O(\frac{1}{d})\cdot \bigg|\sum_{\ell=1}^2 \Big[-\nabla_{\Q^{(t)}_{4,p}}\Loss^{2,\ell}_{5}\Big]_{s,s}\bigg|.
    \end{align*}

\end{lemma}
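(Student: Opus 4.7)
The plan is to argue that the off-diagonal gradient has the same form as the diagonal one, except the indicator event in the expectation now constrains \emph{two} variables rather than one, so the indicator probability drops from $\Theta(1/d)$ to $\Theta(1/d^2)$ while all other factors retain the same order of magnitude. Concretely, I would start from the refined expressions in Lemma~\ref{lem-refined-grad-Q43} and Lemma~\ref{lem-refined-grad-Q44} specialized to $s\neq s'$. For $\Qb_{4,3}$, the $\ell=1$ contribution carries $\1_{\tau(x_0)=s,\,\tau(x_1)=s'}$ and the $\ell=2$ contribution carries $\1_{\tau(x_1)=s,\,\tau(x_0)=s'}$; for $\Qb_{4,4}$, the $\ell=1$ contribution vanishes identically by Lemma~\ref{lem-gradients-Q44} and the $\ell=2$ contribution carries $\1_{\tau(x_1)=s,\,\tau(x_0)=s'}$.

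Next, I would decompose each off-diagonal gradient into three pieces analogous to $\cN_{s,p,\ell,\mathrm{I}}+\cN_{s,p,\ell,\mathrm{II}}+\cN_{s,p,\ell,\mathrm{III}}$ as in \eqref{eq-def-N-s-3-1-1}--\eqref{eq-def-N-s-4-2-3}, and bound each piece by reusing the preliminaries already established: Lemma~\ref{lem-s21-attn-cyc} to bound the attention weights uniformly by $O(1)$; Lemma~\ref{lem-s21-act-cyc}, Lemma~\ref{lem-s21-lambda-cyc}, and Lemma~\ref{lem-s21-act-other-cyc} to restrict activated neurons and control $V_{j,r}(\cdot)-\Lambda_{5,j,r}$ by $O(B)=O(\log d)$; and Lemma~\ref{lem-s21-logit-cyc} together with Lemma~\ref{lem-s21-logit-other-cyc} to bound the logit coefficients. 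The key observation is that none of these per-sample bounds change based on whether we condition on $\{\tau(x_0)=s\}$ alone or on $\{\tau(x_0)=s,\,\tau(x_1)=s'\}$; only the probability of the conditioning event changes, dropping from $\Theta(1/d)$ to $\Theta(1/d^2)$ because $x_0$ and $x_1$ are sampled without replacement uniformly from $\cX$.

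Putting the bounds together, each $\cN$-term on the off-diagonal side satisfies $|\cN^{(t)}_{s,p,\ell,\kappa}|\leq O(\log d)\cdot \Theta(1/d^2)=O(\log d / d^2)$, summing over $\ell\in[2]$ and $\kappa\in\{\mathrm{I},\mathrm{II},\mathrm{III}\}$ preserves this order. Combined with Lemma~\ref{lem-s21-gd1-cyc}, which gives the diagonal gradient of order $\Theta(\log d / d)$, this immediately yields the claimed ratio of $O(1/d)$.

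The main obstacle I anticipate is just bookkeeping: verifying that when the conditioning is tightened to a two-variable event, none of the logit lower bounds used in Lemma~\ref{lem-s21-logit-cyc} degrade (the argument there only uses the values $g_1,g_2,y_0,y_1$ taken, not the identities of $x_0,x_1$), and that the rare event $\cE_1^c$ or $\cE_2^c$ still contributes at most $O(1/\log d)$ even after further conditioning on $\{\tau(x_0)=s,\tau(x_1)=s'\}$, since these rare events are independent of the choice of variable tokens. Once this independence is recorded, the bound follows by the same term-by-term calculation as in the proofs of Lemma~\ref{lem-s21-gd1-cyc} and Lemma~\ref{lem-s21-gd3-cyc}, with the sole quantitative difference being the extra factor of $1/d$ from the tighter indicator.
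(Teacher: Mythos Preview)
Your proposal is correct and follows essentially the same approach as the paper: use the refined off-diagonal expressions from Lemma~\ref{lem-refined-grad-Q43} and Lemma~\ref{lem-refined-grad-Q44}, observe that the two-variable indicator $\{\tau(x_0)=s,\tau(x_1)=s'\}$ has probability $O(1/d^2)$ instead of $O(1/d)$, and combine with Lemma~\ref{lem-s21-gd1-cyc}. The paper's proof is a one-sentence summary of exactly this, so your more detailed decomposition into $\cN$-terms and the independence remark are faithful elaborations rather than a different route.
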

\begin{proof}
The proof follows directly by combining the expressions from \Cref{lem-refined-grad-Q43}, \Cref{lem-refined-grad-Q44}, and \Cref{lem-s21-gd1-cyc}, along with the fact that the event \( \{\tau(x_0) = s, \tau(x_1) = s'\} \) occurs with probability \( O\big(\tfrac{1}{d^2}\big) \).
\end{proof}
\subsubsection{At the End of Stage 2.1}
Putting gradient lemmas together, we can directly prove that \Cref{induction-s21-cyc} holds for all iterations $t$ until the end of stage 2.1, where we can conclude the following:
\begin{lemma}[End of stage 2.1]\label{lem-end-s21-cyc}
  Given $s\in\tau(\X)$, \Cref{induction-s21-cyc} holds for all iterations $t<T_{2,1,s}=O\Big(\frac{d}{\eta \log^2 d}\Big)$, then at the end of stage 2.1, we have
 \begin{enumerate}[(a)]
    \item $[\Qb^{(t)}_{4,p}]_{s,s}=\Omega(\frac{1}{\log d})$ for $p\in\{3,4\}$;
    \item $[\Qb^{(t)}_{4,3}]_{s,s}-[\Qb^{(t)}_{4,4}]_{s,s}\geq \Omega\Big(\frac{1}{\log d}\Big)$;
    \item $\Big|[\Qb^{(t)}_{4,p}]_{s,s'}\Big|\leq O\Big(\frac{[\Qb^{(t)}_{4,p}]_{s,s}}{d}\Big)$ for $s'\in\tau(\X)\not=s$ for $p\in\{3,4\}$; otherwise,  $[\Qb^{(t)}_{4,p}]_{s,s'}=0$.
 \end{enumerate}
    
\end{lemma}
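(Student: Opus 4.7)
The plan is to close the induction hypothesis \Cref{induction-s21-cyc} by integrating the per-step gradient estimates from \Cref{lem-s21-gd1-cyc,lem-s21-gd2-cyc,lem-s21-gd3-cyc,lem-s21-gd4-cyc,lem-s21-gd5} over the stage-2.1 window, and then reading off conclusions (a)--(c) at $t=T_{2,1,s}$. Concretely, I would argue by induction on $t$: assuming \Cref{induction-s21-cyc} holds for all iterations $<t$, each of the listed gradient lemmas applies at iteration $t$, and the gradient-descent update $\Qb^{(t+1)} = \Qb^{(t)} - \eta\nabla_\Qb \sum_{\ell=1}^2\Loss_5^{2,\ell}(F^{(t-1)})$ produces a new iterate for which parts (a)--(c) of the induction continue to hold at $t+1$. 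The learning rate $\eta = 1/\poly(d)$ is small enough that each per-step change is negligible relative to the current magnitude, so monotonicity and ordering statements propagate cleanly.

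For part (a) of the induction, \Cref{lem-s21-gd1-cyc} gives $[\Qb_{4,3}^{(t+1)}]_{s,s} - [\Qb_{4,3}^{(t)}]_{s,s} = \eta\,\Theta(\log d / d) > 0$, so $[\Qb_{4,3}^{(t)}]_{s,s}$ is strictly increasing and, starting from $\Qb^{(0)}=\mathbf 0$, reaches $\Omega(1/\log d)$ precisely at $T_{2,1,s} = \Theta(d/(\eta\log^2 d))$, which gives the time bound and conclusion~(a). For part (b) of the induction, \Cref{lem-s21-gd2-cyc} yields $[\Qb_{4,4}^{(t+1)}]_{s,s} \ge [\Qb_{4,4}^{(t)}]_{s,s} - O(1/\log d)\cdot\eta\Theta(\log d/d) = [\Qb_{4,4}^{(t)}]_{s,s} - \eta\,O(1/d)$, so $|[\Qb_{4,4}^{(t)}]_{s,s}|$ can be at most $O(T_{2,1,s}\cdot \eta/d) = O(1/\log^2 d)$ in magnitude, which is strictly dominated by $[\Qb_{4,3}^{(t)}]_{s,s}$; \Cref{lem-s21-gd3-cyc} then upgrades this into a lower bound on the gap of order $\eta\,\Omega(\log d/d)$ per step, integrating to the claimed $\Omega(1/\log d)$ gap at $t = T_{2,1,s}$. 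Once $[\Qb_{4,3}^{(t)}]_{s,s}$ crosses $\Omega(\varrho/\log d)$, \Cref{lem-s21-gd4-cyc} additionally guarantees that $[\Qb_{4,4}^{(t)}]_{s,s}$ starts growing positively at a rate comparable to $[\Qb_{4,3}^{(t)}]_{s,s}$, while \Cref{lem-s21-gd3-cyc} keeps the gap non-vanishing; chaining these two pieces gives conclusion~(b) and the matching bound $[\Qb_{4,4}^{(t)}]_{s,s}=\Omega(1/\log d)$ (hence also conclusion~(a) for $p=4$).

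For part (c) of the induction, \Cref{lem-s21-gd5} asserts that the off-diagonal gradient is at most $O(1/d)$ times the diagonal one, so a step-wise comparison
\begin{equation*}
  \bigl|[\Qb_{4,p}^{(t+1)}]_{s,s'}\bigr|\;\le\;\bigl|[\Qb_{4,p}^{(t)}]_{s,s'}\bigr| + \eta\,O(1/d)\cdot\bigl|\nabla_{[\Qb_{4,p}]_{s,s}}\Loss\bigr|,
\end{equation*}
telescoped against the diagonal growth $[\Qb_{4,p}^{(t)}]_{s,s}=\sum_{\tau\le t}\eta\cdot|\nabla_{[\Qb_{4,p}]_{s,s}}\Loss^{(\tau)}|$, yields $|[\Qb_{4,p}^{(t)}]_{s,s'}|\le O(1/d)\cdot [\Qb_{4,p}^{(t)}]_{s,s}$ uniformly in $t\le T_{2,1,s}$, preserving part~(c) of the induction and giving conclusion~(c). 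Finally, for indices $s'\notin\tau(\cX)$, the structural decomposition in \Cref{lem-gradients-Q43,lem-gradients-Q44} shows that the gradient is identically zero (since the relevant indicator $\1_{s=\tau(\cdot),s'=\tau(\cdot)}$ vanishes), so these entries remain at their initial value of $0$.

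The main obstacle is a chicken-and-egg issue in the induction: the gradient lemmas hold \emph{assuming} \Cref{induction-s21-cyc} at time $t$, and I want to use them to establish the hypothesis at time $t+1$. I would handle this by a discrete Gronwall-style argument along the trajectory, where a single step of size $\eta$ can at most change $[\Qb_{4,3}^{(t)}]_{s,s}$ by $\eta\,\tilde O(\log d/d) \ll 1/\log d$, so the inductive bounds (which are themselves of order $1/\log d$) are preserved with margin; the most delicate piece is the transition at $[\Qb_{4,3}^{(t)}]_{s,s}\approx \varrho/\log d$ where \Cref{lem-s21-gd4-cyc} ``turns on'' and the role of $[\Qb_{4,4}^{(t)}]_{s,s}$ shifts from a passive (possibly slightly negative) variable to an actively growing one, which requires splitting the time interval $[0,T_{2,1,s}]$ into a sub-$\varrho/\log d$ phase and a super-$\varrho/\log d$ phase and verifying the induction separately on each.
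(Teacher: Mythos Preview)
Your proposal is correct and matches the paper's approach; the paper itself offers no detailed argument here, simply stating that one ``put[s] the gradient lemmas together'' to close \Cref{induction-s21-cyc} and read off (a)--(c), and your write-up fleshes out exactly that bootstrap (integrating \Cref{lem-s21-gd1-cyc}--\Cref{lem-s21-gd5} over the window, splitting at the $\varrho/\log d$ threshold where \Cref{lem-s21-gd4-cyc} activates, and telescoping the off-diagonal bound). One minor point worth tightening: in part~(c) for $p=4$, the telescoping identity $[\Qb_{4,4}^{(t)}]_{s,s}=\sum_{\tau\le t}\eta|\nabla\Loss^{(\tau)}|$ need not hold before $[\Qb_{4,4}]_{s,s}$ becomes monotone, so it is cleaner to use the absolute bound $|\nabla_{[\Qb_{4,p}]_{s,s'}}\Loss|\le O(\log d/d^2)$ from \Cref{lem-s21-gd5} directly and compare to the $p=3$ diagonal (or to the final $\Omega(1/\log d)$ value), rather than to the possibly non-monotone $p=4$ diagonal.
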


\subsection{Stage 2.2: Continual Growth of Diagonal Entries}
In Stage 2.2, the diagonal entries \( [\Q_{4,3}]_{s,s} \) and \( [\Q_{4,4}]_{s,s} \) continue to grow until they reach a certain threshold. The analysis in this stage parallels that of Stage 2.1, but our focus now shifts to the gradients contributed by \( \ell = 2 \), as the logit at \( \ell = 1 \) is already near-optimal and thus contributes negligibly to the growth of \( \Q_{4,3} \) and \( \Q_{4,4} \).

\begin{induction}\label{induction-s22-cyc}
  Given $s\in\tau(\X)$,  let $T_{2,2,s}$ denote the first time that $[\Qb_{4,3}]_{s,s}$ reaches $0.0001$. For all iterations $T_{2,1,s}\leq t<  T_{2,2,s}$, we have the following holds
     \begin{enumerate}[(a)]
        \item $[\Qb^{(t)}_{4,3}]_{s,s}, [\Qb^{(t)}_{4,4}]_{s,s}\leq O(1)$ monotonically increases;
        \item $[\Qb^{(t)}_{4,3}]_{s,s}-[\Qb^{(t)}_{4,4}]_{s,s}\in \Big[\Omega\big(\frac{1}{\log d}\big), O(1)\Big]$;
        \item for $(p,q)\in\{(4,3), (4,4)\}$, $\Big|[\Qb^{(t)}_{p,q}]_{s,s'}\Big|\leq O\Big(\frac{[\Qb^{(t)}_{p,q}]_{s,s}}{d}\Big)$ for $s'\in\tau(\X)\not=s$; other $[\Qb^{(t)}_{p,q}]_{s,s'}=0$.
     \end{enumerate}
  \end{induction}
  Throughout the following analysis, instead of $\cE_2$ defined in \eqref{eq-cyc-event-2}, we consider a renewed event $\tilde{\cE}_2$ for $\ell=2$:
  \begin{align}\label{eq-cyc-event-3}
    \tilde{\cE}_2\triangleq\Big\{g_1\neq g_2\wedge y_0\neq y_1\}.
  \end{align}
  \subsubsection{Attention and Logit Preliminaries}
  The proof in this part proceeds analogously to the arguments in \Cref{sec-s21-attn-cyc}, with the induction hypothesis from \Cref{induction-s22-cyc} incorporated. Hence, we omit the details here.

\begin{lemma}\label{lem-s22-attn-cyc}
    If \Cref{induction-s22-cyc} holds for all iterations $\in [T_{2,1,s}, t)$, given input $\Zb^{2,\ell-1}$, then we have 
\begin{enumerate}
    \item for $\ell=1$,  
    \begin{enumerate}[(a)]
    \item $\attn^{(t)}_{\ans,0\to \pred, 1}\in \Big[\frac{1}{3}+\Omega\big(\frac{1}{\log d}\big), \frac{1}{3}+c_1\Big]$, where $c_1>0$ is a small constant; 
    \item  $\attn^{(t)}_{\ans,0\to \pred, 2} \in \Big[\frac{1}{3}-c_2, \frac{1}{3}-\Omega\big(\frac{1}{\log d}\big)\Big]$, where $c_2>0$ is a small constant;
    \item $\attn^{(t)}_{\ans,0\to \pred, 2}+\Omega\big(\frac{1}{\log d}\big)\leq \attn^{(t)}_{\ans,0\to \ans, 0}$;
    \item $ \attn^{(t)}_{\ans,0\to \pred, 1} -\attn^{(t)}_{\ans,0\to \ans, 0}\in \Big[\Omega\big(\frac{1}{\log d}\big), c_3\Big]$.
    \end{enumerate}
    
      \item for $\ell=2$, 
      \begin{enumerate}[(a)]
        \item $\attn^{(t)}_{\ans,1\to \pred, 2}\in \Big[\frac{1}{4}+\Omega\big(\frac{1}{\log d}\big), \frac{1}{4}+c_4\Big]$, where $c_4>0$ is a small constant; 
        \item  $\attn^{(t)}_{\ans,1\to \pred, 1}, \attn^{(t)}_{\ans,1\to \ans, 0} \in \Big[\frac{1}{4}-c_5, \frac{1}{4}-\Omega\big(\frac{1}{\log d}\big)\Big]$, moreover, $\big|\attn^{(t)}_{\ans,1\to \ans, 0}- \attn^{(t)}_{\ans,1\to \pred, 1}\big|\leq O\Big(\frac{1}{d}\Big)$, where $c_5>0$ is a small constant;
        \item $\attn^{(t)}_{\ans,1\to \pred, 1},\attn^{(t)}_{\ans,1\to \ans, 0}\leq \attn^{(t)}_{\ans,1\to \ans, 1}-\Omega\Big(\frac{1}{\log d}\Big) $;
        \item $\attn^{(t)}_{\ans,1\to \pred, 2}- \attn^{(t)}_{\ans,1\to \ans, 1} \in \Big[\Omega\big(\frac{1}{\log d}\big), c_6\Big]$, where $c_6>0$ is a small constant. 
        \end{enumerate}
\end{enumerate}
Notice that the constant $c_1-c_6$ depends on the threshold $0.0001$ in \Cref{induction-s22-cyc}. We choose the  threshold $0.0001$ small enough to ensure $2C_B(c_1+c_2)< 1-c_6C_B$ and $1-4c_5C_B>0$.
\end{lemma}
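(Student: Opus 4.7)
}
The proof will be essentially a careful softmax computation exploiting the block-sparsity of $\Qb$ from \Cref{assump-Q-structure} together with the magnitude bounds supplied by \Cref{induction-s22-cyc}. The plan is to first reduce each attention score to a function of only a handful of entries of $\Qb$, then use the entry-wise bounds from the induction together with an $e^x$ Taylor expansion to sandwich each score between its claimed lower and upper bound.

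The first step is to write the unnormalized logits explicitly. For $\ell=1$ and input $\Zb^{2,0}$, \Cref{assump-Q-structure} and \eqref{eq:Q-role} collapse
\[
    \Zb_{\ans,0}^\top \Qb \Zb_{\pred,1} = [\Qb_{4,3}]_{\tau(x_0),\tau(x_0)},\qquad
    \Zb_{\ans,0}^\top \Qb \Zb_{\pred,2} = [\Qb_{4,3}]_{\tau(x_0),\tau(x_1)},\qquad
    \Zb_{\ans,0}^\top \Qb \Zb_{\ans,0} = [\Qb_{4,4}]_{\tau(x_0),\tau(x_0)},
\]
so the three attention scores are given by a 3-term softmax in these quantities. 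Analogously, for $\ell=2$ and input $\Zb^{2,1}$ the four attention scores are a 4-term softmax in $[\Qb_{4,3}]_{\tau(x_1),\tau(x_1)}$, $[\Qb_{4,3}]_{\tau(x_1),\tau(x_0)}$, $[\Qb_{4,4}]_{\tau(x_1),\tau(x_1)}$, and $[\Qb_{4,4}]_{\tau(x_1),\tau(x_0)}$. By \Cref{induction-s22-cyc}(c), the two off-diagonal logits are at most $O(1/d)$ times the diagonal logits, so up to $O(1/d)$ multiplicative error they can be replaced by $0$; this immediately implies $|\attn^{(t)}_{\ans,1\to\pred,1}-\attn^{(t)}_{\ans,1\to\ans,0}|\le O(1/d)$, which is the tight portion of item 2(b).

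The second step is the quantitative sandwich. Let $a=[\Qb^{(t)}_{4,3}]_{s,s}$ and $b=[\Qb^{(t)}_{4,4}]_{s,s}$; by \Cref{induction-s22-cyc}(a)--(b) we have $0\le a,b\le O(1)$ and $a-b\in[\Omega(1/\log d),O(1)]$. Writing
\[
    \attn^{(t)}_{\ans,0\to\pred,1} \;=\; \frac{e^a}{e^a+e^b+e^{O(a/d)}},\qquad
    \attn^{(t)}_{\ans,0\to\pred,2} \;=\; \frac{e^{O(a/d)}}{e^a+e^b+e^{O(a/d)}},
\]
and using that $e^a,e^b=\Theta(1)$ because $a,b=O(1)$, the lower bound $\attn^{(t)}_{\ans,0\to\pred,1}\ge 1/3+\Omega(1/\log d)$ and the symmetric upper bound $\attn^{(t)}_{\ans,0\to\pred,2}\le 1/3-\Omega(1/\log d)$ follow from the elementary inequality $e^{a}-e^b\ge (a-b)\cdot\min(e^a,e^b)$ applied to the gap $a-b\ge \Omega(1/\log d)$. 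Constants $c_1,c_2,c_3$ in the upper-bound side come from the ceiling $a-b\le O(1)$ and $a\le 0.0001$. The $\ell=2$ items are analogous: now one has two ``small'' logits of type $b$ (corresponding to $\pred,1$ and $\ans,0$) and two ``large'' logits $a$ (corresponding to $\pred,2$ and $\ans,1$), plus the further gap $a-b$ inside each pair, and the same Taylor argument produces the claimed 4-sided constants $c_4,c_5,c_6$.

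The last step is bookkeeping for the constants. The constraints $2C_B(c_1+c_2)<1-c_6C_B$ and $1-4c_5 C_B>0$ are used later in the logit analysis of Stage 2.2 (not in this lemma itself). The choice of the threshold $0.0001$ in \Cref{induction-s22-cyc} bounds $a,b$ uniformly, so $c_1,\dots,c_6$ are each monotone in that threshold; shrinking the threshold makes them as small as required. The only mildly delicate point, and what I expect to be the main obstacle, is proving the strict separation $\attn^{(t)}_{\ans,0\to\pred,2}+\Omega(1/\log d)\le\attn^{(t)}_{\ans,0\to\ans,0}$ in item 1(c) (and its $\ell=2$ analogue in 2(c)): this requires using not the raw induction gap but the \emph{accumulated} Stage~2.1 gap, i.e., that $a-b\ge \Omega(1/\log d)$ has persisted since $t=T_{2,1,s}$ as shown in \Cref{lem-end-s21-cyc}(b). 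Modulo that one input, everything else is routine Taylor and softmax algebra, which I will spell out in a single unified lemma for both $\ell=1$ and $\ell=2$.
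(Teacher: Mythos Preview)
Your approach is correct and is exactly what the paper does—the paper in fact omits the proof, saying only that it ``proceeds analogously to the arguments in \Cref{sec-s21-attn-cyc}'', i.e., the same softmax-plus-Taylor computation you outline. Two small corrections to your write-up: (i) for $\ell=2$ the four logits are $(a,\epsilon_1,b,\epsilon_2)$ with $\epsilon_1,\epsilon_2=O(1/d)$, so the ``large'' pair is $(a,b)$ (for $\pred,2$ and $\ans,1$ respectively), not two copies of $a$ as your description suggests; (ii) the extra Stage-2.1 input you correctly sense is needed for items 1(c) and 2(c) is the lower bound $b=[\Qb_{4,4}]_{s,s}\ge\Omega(1/\log d)$ from \Cref{lem-end-s21-cyc}(a) combined with the monotonicity in \Cref{induction-s22-cyc}(a), not the gap $a-b$, which is already part of \Cref{induction-s22-cyc}(b).
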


\begin{lemma}\label{lem-s22-act-cyc}
    If \Cref{induction-s22-cyc} holds for all iterations $\in [T_{2,1,s}, t)$, given input $\Zb^{2,\ell-1}$, then we have
    \begin{enumerate}
        \item for $\ell=1$, if $\Zb^{2,\ell-1}\in \cE_1$, then
  \begin{enumerate}[(a)]
            \item for $j=j_1$, 
       $\Lambda^{(t)}_{5,j_1,r}\ll -\varrho$  for  $r\in \hat{\fA}_{j_1}\setminus\{r_{g_1\cdot y_0}\}$; 
            \item for $j=j'_1\triangleq \tau\big(g_2(y_0)\big)$, 
            $\Lambda^{(t)}_{5,j'_1,r}\ll -\varrho$ for  $r\in \hat{\fA}_{j'_1}\setminus\{r_{g_2\cdot y_0}\}$;
            \item for other $j\in\tau(\Y)$, $r$ is not activated for all $r\in \hat{\fA}_{j}$, i.e., $\Lambda^{(t)}_{5,j,r}\ll -\varrho$.
        \end{enumerate}

       \item $\ell=2$, if  $\Zb^{2,\ell-1}\in \tilde{\cE}_2$, then
                \begin{enumerate}[(a)]
                    \item for $j=j_2$,  $\Lambda^{(t)}_{5,j_2,r}\ll -\varrho$  for  $r\in \hat{\fA}_{j_2}\setminus\{r_{g_2\cdot y_1}\}$;
  \item for $j=j'_2\triangleq \tau\big(g_2(y_0)\big)$, 
            $\Lambda^{(t)}_{5,j'_2,r}\ll -\varrho$ for  $r\in \hat{\fA}_{j'_2}\setminus\{r_{g_2\cdot y_0}\}$;
            \item for other $j\in\tau(\Y)$, $r$ is not activated for all $r\in \hat{\fA}_{j}$, i.e., $\Lambda^{(t)}_{5,j,r}\ll -\varrho$.
        \end{enumerate}
    \end{enumerate}
        
\end{lemma}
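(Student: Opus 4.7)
}
The plan is to combine three ingredients already available to us: the exact formula for \(\Lambda^{(t)}_{5,j,r}\) from \Cref{lem-lambda-char}(a), the sign structure of the trained FFN features (\Cref{lem-prop-psi-cyc}), and the tight attention bounds in \Cref{lem-s22-attn-cyc} coming from \Cref{induction-s22-cyc}. The overall strategy is to first enumerate, for each target class \(j\), exactly which neurons can lie in \(\hat{\fA}_j\) given the clauses that actually appear in \(\Zb^{L,\ell-1}\); then rewrite \(\Lambda^{(t)}_{5,j,r}\) as a signed attention combination using \Cref{lem-prop-psi-cyc}; and finally use the attention gap bounds from \Cref{lem-s22-attn-cyc} to show the combination is bounded above by \(-\Omega(B/\log d) \ll -\varrho\).

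The first step is an enumeration argument exploiting simple transitivity. For \(\ell=1\), \(\hat{\cG}=\{g_1,g_2\}\) and \(\hat{\cY}=\{y_0\}\); because the action is simply transitive, for every \(j\in\tau(\cY)\) and every fixed \(g\in\hat{\cG}\) (resp.\ \(y\in\hat{\cY}\)) there is a \emph{unique} \(y\) (resp.\ \(g\)) with \(g\cdot y=\tau^{-1}(j)\). This gives \(|\hat{\fA}_j|\le 3\) in each case, and a short bookkeeping shows: for \(j=j_1\), the only non-target neuron is \(r_{g_2\cdot y''}\) with \(y''=g_2^{-1}(\tau^{-1}(j_1))\neq y_0\) on \(\cE_1\); for \(j=j_1'\) the only non-target neuron is \(r_{g_1\cdot y''}\) with \(y''=g_1^{-1}(\tau^{-1}(j_1'))\neq y_0\); for every other \(j\in\tau(\cY)\) no neuron in \(\hat{\fA}_j\) is activated for the pair \((g_1,y_0)\) or \((g_2,y_0)\). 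For \(\ell=2\) on \(\tilde{\cE}_2\) (where \(g_1\neq g_2\) and \(y_0\neq y_1\)), the analogous enumeration over \(\hat{\cG}=\{g_1,g_2\}\), \(\hat{\cY}=\{y_0,y_1\}\) again produces at most a constant number of candidate non-target neurons per class.

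The second step evaluates \(\Lambda^{(t)}_{5,j,r}\) for each non-target neuron identified above. By \Cref{lem-prop-psi-cyc}, for any such \(r=r_{g\cdot y}\), the inner products \(V_{j,r}(g'),V_{j,r}(y')\) appearing as coefficients in \(\Lambda^{(t)}_{5,j,r}\) are all \(\pm B\pm O(\delta)\), with the sign \(+\) only for the unique component matching \((g,y)\) and \(-\) for all other components present in the context. Substituting into \Cref{lem-lambda-char}(a) and collecting, every such \(\Lambda^{(t)}_{5,j,r}\) reduces to a linear form of the type \(B\bigl(\pm\attn_{\ans,\ell-1\to\pred,\cdot}\pm\attn_{\ans,\ell-1\to\ans,\cdot}\bigr)\) where at most one of the signs is \(+\). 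Using that the attentions sum to \(1\), each such form collapses to \(B(2\alpha-1)\) for some attention mass \(\alpha\) of a clause that is \emph{not} the retrieval target.

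The third step applies \Cref{lem-s22-attn-cyc}. Concretely, the lemma guarantees \(\alpha\le 1/3-\Omega(1/\log d)\) for the relevant predicate mass at \(\ell=1\) and \(\alpha\le 1/4-\Omega(1/\log d)\) at \(\ell=2\) (and analogously for answer masses), so in every non-target case one obtains
\[
\Lambda^{(t)}_{5,j,r}\;\le\;B\bigl(2\alpha-1\bigr)+O(\delta)\;\le\;-\Omega(B/\log d)\;\ll\;-\varrho,
\]
which is the desired conclusion. The restriction $2C_B(c_1+c_2)<1-c_6C_B$, $1-4c_5C_B>0$ declared after \Cref{lem-s22-attn-cyc} is exactly the quantitative statement needed to rule out the worst-case combination of attention slacks against \(B=C_B\log d\), and it is what we would invoke at this step.

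The main obstacle is not any single calculation but the clean case analysis in step one: at \(\ell=2\) there are four clauses, and one must carefully track which non-target neurons are populated due to the OR condition in the definition of \(\hat{\fA}_j\) (either \(g\in\hat{\cG}\) or \(y\in\hat{\cY}\)). In particular, the fact that \(j_2'=\tau(g_2(y_0))\) and not \(\tau(g_1(y_1))\) reflects exactly which confusion the attention layer must still suppress after Stage~2.1, and the proof has to verify that the attention gap guaranteed by \Cref{lem-s22-attn-cyc}(2)(d) is strong enough to beat \emph{this} particular confusion while remaining consistent with the small slacks allowed by items (2)(b)--(c).
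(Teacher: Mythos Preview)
Your approach is essentially the paper's: it explicitly defers the proof to the Stage~2.1 analogue (\Cref{lem-s21-act-cyc}), whose proof uses exactly your three ingredients---\Cref{lem-lambda-char}(a), the sign structure in \Cref{lem-prop-psi-cyc}, and the attention bounds---so the plan is right.

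One imprecision to fix: your claim in step~2 that ``at most one of the signs is $+$'' is false on $\tilde\cE_2$. For instance, at $\ell=2$ the neuron $r_{g_1\cdot y_1}$ (in class $j=\tau(g_1(y_1))$) has \emph{two} positive contributions, $V_{j,r}(g_1)$ and $V_{j,r}(y_1)$, giving
\[
\Lambda^{(t)}_{5,j,r}\;\approx\;B\bigl(\attn^{(t)}_{\ans,1\to\pred,1}+\attn^{(t)}_{\ans,1\to\ans,1}-\attn^{(t)}_{\ans,1\to\pred,2}-\attn^{(t)}_{\ans,1\to\ans,0}\bigr)\;=\;-B\bigl(\attn^{(t)}_{\ans,1\to\pred,2}-\attn^{(t)}_{\ans,1\to\ans,1}\bigr)\pm\tilde O(B/d),
\]
and the same two-positive pattern arises for $r_{g_1\cdot y_0}$ when $g_1(y_0)=g_2(y_1)$. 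These cases are \emph{not} handled by the generic $B(2\alpha-1)$ bound with a single non-target $\alpha$; they require precisely the lower bound $\attn^{(t)}_{\ans,1\to\pred,2}-\attn^{(t)}_{\ans,1\to\ans,1}\ge\Omega(1/\log d)$ from \Cref{lem-s22-attn-cyc}~2(d), together with $|\attn^{(t)}_{\ans,1\to\pred,1}-\attn^{(t)}_{\ans,1\to\ans,0}|\le\tilde O(1/d)$ from~2(b). You do flag 2(d) as the key missing piece in your last paragraph, so the overall plan survives; just make sure the case analysis in step~2 explicitly tracks the two-positive scenarios rather than claiming they do not occur.
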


\begin{lemma}\label{lem-s22-lambda-cyc}
    If \Cref{induction-s22-cyc} holds for all iterations $\in [T_{2,1,s}, t)$, given input $\Zb^{2,\ell}$, then we have
    \begin{enumerate}
        \item $\ell=1$, for $\Zb^{2,\ell-1}\in\cE_1$, 
  \begin{enumerate}[(a)]
            \item $\Lambda^{(t)}_{5,j_1,r_{g_1\cdot y_0}}\geq \Big(\frac{1}{3}+\Omega\big(\frac{1}{\log d}\big)\Big)B$;
            \item  $\Omega(1)\leq\Lambda^{(t)}_{5,j_1,r_{g_1\cdot y_0}}-\Lambda^{(t)}_{5,j'_1,r_{g_2\cdot y_0}}\leq 2(c_1+c_2)B$.
        \end{enumerate}
        \item $\ell=2$, for $\Zb^{2,\ell-1}\in\tilde{\cE}_2$, 
   \begin{enumerate}[(a)]
            \item $\Lambda^{(t)}_{5,j_2,r_{g_2\cdot y_1}}\in \Big[\Omega(1), 4c_5B \Big]$; 
            \item$\Lambda^{(t)}_{5,j'_2,r_{g_2\cdot y_0}}\in \big[\Omega(1), c_6B \big]$ and  $\Lambda^{(t)}_{5,j_2,r_{g_2\cdot y_1}}-\Lambda^{(t)}_{5,j'_2,r_{g_2\cdot y_0}}\geq \Omega(1)$.

        \end{enumerate}
    \end{enumerate}
  
\end{lemma}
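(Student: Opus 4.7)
\medskip

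\noindent\textbf{Proof proposal for \Cref{lem-s22-lambda-cyc}.} The plan is to directly plug the attention-score bounds of \Cref{lem-s22-attn-cyc} (valid under \Cref{induction-s22-cyc}) into the decomposition of $\Lambda_{5,j,r}$ provided by \Cref{lem-lambda-char}(a), and then use the feature-shape/cancellation properties from \Cref{lem-prop-psi-cyc} to control the signs and magnitudes of the individual $V$-terms. The structure exactly parallels the proof of \Cref{lem-s21-lambda-cyc}; the only difference is that the attention scores are no longer within $O(1/\log d)$ of uniform but have provable order-$1/\log d$ gaps in the right directions, which is what upgrades the earlier $\frac13 B - O(1)$ lower bound to a $(\frac13+\Omega(\frac{1}{\log d}))B$ lower bound.

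\noindent For part~1 ($\ell=1$, $\Zb^{2,0}\in\cE_1$, so $g_1\neq g_2$), I write
\[
\Lambda^{(t)}_{5,j_1,r_{g_1\cdot y_0}}
= \attn^{(t)}_{\ans,0\to\pred,1}\,V_{j_1,r_{g_1\cdot y_0}}(g_1)
+ \attn^{(t)}_{\ans,0\to\pred,2}\,V_{j_1,r_{g_1\cdot y_0}}(g_2)
+ \attn^{(t)}_{\ans,0\to\ans,0}\,V_{j_1,r_{g_1\cdot y_0}}(y_0)\pm \tilde O(\sigma_0).
\]
By \eqref{attn-init-prop-cyc-1}--\eqref{attn-init-prop-cyc-3} the three $V$-values are $B,\, -B,\, B$ up to $O(\delta)$, so the bracket collapses to $B\bigl(\attn^{(t)}_{\ans,0\to\pred,1}-\attn^{(t)}_{\ans,0\to\pred,2}+\attn^{(t)}_{\ans,0\to\ans,0}\bigr)\pm O(\delta)=B(1-2\attn^{(t)}_{\ans,0\to\pred,2})\pm O(\delta)$. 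The lower bound in \Cref{lem-s22-attn-cyc}(b) gives $\attn^{(t)}_{\ans,0\to\pred,2}\le \tfrac13-\Omega(\tfrac{1}{\log d})$, yielding (a). For (b) the same expansion applied to $\Lambda^{(t)}_{5,j'_1,r_{g_2\cdot y_0}}$ gives a symmetric formula with $\pred,1$ and $\pred,2$ swapped, so the difference equals $2B(\attn^{(t)}_{\ans,0\to\pred,1}-\attn^{(t)}_{\ans,0\to\pred,2})\pm O(\delta)$; items (a)--(b) of \Cref{lem-s22-attn-cyc} pin this into $[\Omega(B/\log d),\,2(c_1+c_2)B]=[\Omega(1),\,2(c_1+c_2)B]$ using $B=C_B\log d$.

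\noindent For part~2 ($\ell=2$, $\Zb^{2,1}\in\tilde\cE_2$, so $g_1\ne g_2$ and $y_0\ne y_1$), the four $V$-values at neuron $r_{g_2\cdot y_1}$ of class $j_2$ are $-B,B,-B,B$ (respectively at $g_1,g_2,y_0,y_1$), so
\[
\Lambda^{(t)}_{5,j_2,r_{g_2\cdot y_1}}
= B\bigl(2(\attn^{(t)}_{\ans,1\to\pred,2}+\attn^{(t)}_{\ans,1\to\ans,1})-1\bigr)\pm O(\delta).
\]
The upper bound in \Cref{lem-s22-attn-cyc}(b) gives $\attn^{(t)}_{\ans,1\to\pred,1}+\attn^{(t)}_{\ans,1\to\ans,0}\le \tfrac12-2\cdot\Omega(\tfrac{1}{\log d})$, equivalently $\attn^{(t)}_{\ans,1\to\pred,2}+\attn^{(t)}_{\ans,1\to\ans,1}\ge \tfrac12+\Omega(\tfrac{1}{\log d})$, which produces the $\Omega(1)$ lower bound in (a); the matching upper bound $\tfrac12+2c_5$ delivers the $4c_5B$ ceiling. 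The analogous computation for $\Lambda^{(t)}_{5,j'_2,r_{g_2\cdot y_0}}$ yields $B\bigl((\attn^{(t)}_{\ans,1\to\pred,2}-\attn^{(t)}_{\ans,1\to\ans,1})+(\attn^{(t)}_{\ans,1\to\ans,0}-\attn^{(t)}_{\ans,1\to\pred,1})\bigr)\pm O(\delta)$, whose first summand lies in $[\Omega(B/\log d),\,c_6B]$ by \Cref{lem-s22-attn-cyc}(d) and whose second is $O(B/d)$ by \Cref{lem-s22-attn-cyc}(b), giving the range $[\Omega(1),\,c_6B]$ in (b). Finally, subtracting the two expansions, the $\pred$-contributions cancel and I obtain $\Lambda^{(t)}_{5,j_2,r_{g_2\cdot y_1}}-\Lambda^{(t)}_{5,j'_2,r_{g_2\cdot y_0}}=2B(\attn^{(t)}_{\ans,1\to\ans,1}-\attn^{(t)}_{\ans,1\to\ans,0})\pm O(\delta)\ge \Omega(1)$ using \Cref{lem-s22-attn-cyc}(c).

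\noindent The calculation is mechanical once the decompositions are written out, so there is no serious obstacle; the only thing that requires care is to ensure that the $O(\delta)$ slack coming from the \emph{imperfect} cancellation in \eqref{attn-init-prop-cyc-2}--\eqref{attn-init-prop-cyc-3} is genuinely negligible compared to the $\Omega(B/\log d)=\Omega(1)$ signal. Since $\delta=o(1)$ by the stage-1 convergence guarantee of \Cref{thm:mlp-simply-transitive}, this is harmless, and the proof goes through verbatim.
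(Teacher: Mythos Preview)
Your proposal is correct and takes the same approach as the paper, which in fact omits the proof entirely and says only that it ``proceeds analogously to the arguments in \Cref{sec-s21-attn-cyc}, with the induction hypothesis from \Cref{induction-s22-cyc} incorporated.'' Your write-up carries out precisely those analogous computations: expand $\Lambda$ via \Cref{lem-lambda-char}(a), plug in the $\pm B$ sign pattern from \Cref{lem-prop-psi-cyc}, and read off the ranges from the attention bounds in \Cref{lem-s22-attn-cyc}.
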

\begin{lemma}
\label{lem-s22-logit-cyc}
    If \Cref{induction-s22-cyc} holds for all iterations $\in [T_{2,1,s}, t)$, given input $\Zb^{2,\ell-1}$, then we have
\begin{enumerate}   
    \item for $\ell=1$, if $\Zb^{2,\ell-1}\in \cE_1$, 
 $\logit^{(t)}_{5,j_1^{\prime}}\geq \Omega\Big(\frac{1}{d^{2C_B(c_1+c_2)}}\Big)$;
    \item for $\ell=2$,  if $\Zb^{2,\ell-1}\in \tilde{\cE}_2$, $1-\logit^{(t)}_{5,j_2}=\Omega(1)$, $\logit^{(t)}_{5,j_2^{\prime}}=O\big(\frac{1}{d^{1-c_6C_B }}\big)$. 
\end{enumerate}
\end{lemma}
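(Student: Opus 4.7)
}

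The plan is to reduce each logit to its softmax form and then bound the logit ratio by carefully tracking which token positions have non-negligible contribution to $F_{5,j}$, leveraging the activation structure from \Cref{lem-s22-act-cyc} and the $\Lambda$ bounds from \Cref{lem-s22-lambda-cyc}. More concretely, for any $j\in[d]$ we will write
\[
F_{5,j}^{(t)}\bigl(\Zb^{2,\ell-1}\bigr)
\;=\;\sum_{r\in[m]}\ReLU\bigl(\Lambda_{5,j,r}^{(t)}\bigr),
\]
and split the sum over $j$ in the denominator of the softmax into three groups: (i) the one or two classes with a strongly activated neuron, (ii) the remaining $j\in\tau(\cY)$ where \Cref{lem-s22-act-cyc} forces all $\Lambda_{5,j,r}^{(t)}\ll -\varrho$ and hence $F_{5,j}^{(t)}\le O(\varrho m/q)=O(1/\polylog d)$, and (iii) $j\notin\tau(\cY)$ where \Cref{lem-lambda-char}(c) gives $F_{5,j}^{(t)}\le \tilde O(m\sigma_0^{q})$.

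For part (1), on the event $\cE_{1}$ the only activated neurons are $r_{g_{1}\cdot y_{0}}$ for $j_{1}$ and $r_{g_{2}\cdot y_{0}}$ for $j_{1}'$; therefore
\[
F_{5,j_{1}}^{(t)}=\Lambda_{5,j_{1},r_{g_{1}\cdot y_{0}}}^{(t)}\pm o(1),\qquad F_{5,j_{1}'}^{(t)}=\Lambda_{5,j_{1}',r_{g_{2}\cdot y_{0}}}^{(t)}\pm o(1).
\]
By \Cref{lem-s22-lambda-cyc}(1), $\Lambda_{5,j_{1}',r_{g_{2}\cdot y_{0}}}^{(t)}\ge \bigl(\tfrac{1}{3}+\Omega(1/\log d)-2(c_{1}+c_{2})\bigr)B=\Omega(B)$ (using that the threshold in \Cref{induction-s22-cyc} is small enough), while $F_{5,j_{1}}^{(t)}-F_{5,j_{1}'}^{(t)}\le 2(c_{1}+c_{2})B=2(c_{1}+c_{2})C_{B}\log d$. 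Then the softmax denominator is
\[
e^{F_{5,j_{1}'}^{(t)}}\bigl(1+e^{F_{5,j_{1}}^{(t)}-F_{5,j_{1}'}^{(t)}}\bigr)+(d-2)\cdot e^{O(1/\polylog d)}+O\bigl(d\cdot e^{\tilde O(m\sigma_{0}^{q})}\bigr),
\]
which is at most $e^{F_{5,j_{1}'}^{(t)}}\cdot d^{2C_{B}(c_{1}+c_{2})}+O(d)=O\bigl(e^{F_{5,j_{1}'}^{(t)}}\,d^{2C_{B}(c_{1}+c_{2})}\bigr)$ because $e^{F_{5,j_{1}'}^{(t)}}\ge e^{\Omega(B)}\gg d$. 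Dividing gives $\logit_{5,j_{1}'}^{(t)}\ge \Omega(1/d^{2C_{B}(c_{1}+c_{2})})$.

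For part (2), on $\tilde{\cE}_{2}$ the activated neurons are $r_{g_{2}\cdot y_{1}}$ for $j_{2}$ and $r_{g_{2}\cdot y_{0}}$ for $j_{2}'$, so again $F_{5,j_{2}}^{(t)}=\Lambda_{5,j_{2},r_{g_{2}\cdot y_{1}}}^{(t)}\pm o(1)$ and $F_{5,j_{2}'}^{(t)}=\Lambda_{5,j_{2}',r_{g_{2}\cdot y_{0}}}^{(t)}\pm o(1)$. By \Cref{lem-s22-lambda-cyc}(2), $F_{5,j_{2}}^{(t)}\le 4c_{5}B+o(1)$ and $F_{5,j_{2}'}^{(t)}\le c_{6}B+o(1)$, so $e^{F_{5,j_{2}}^{(t)}}\le d^{4c_{5}C_{B}}$ and $e^{F_{5,j_{2}'}^{(t)}}\le d^{c_{6}C_{B}}$. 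The denominator is at least $(d-2)e^{-o(1)}=\Omega(d)$ from the other $j\in\tau(\cY)$ contributing $F_{5,j}^{(t)}=o(1)$. Then
\[
1-\logit_{5,j_{2}}^{(t)}\;\ge\;\frac{\Omega(d)}{e^{F_{5,j_{2}}^{(t)}}+\Omega(d)}\;\ge\;\Omega(1)
\quad\text{since }4c_{5}C_{B}<1,
\]
\[
\logit_{5,j_{2}'}^{(t)}\;\le\;\frac{d^{c_{6}C_{B}}}{\Omega(d)}\;=\;O\bigl(1/d^{1-c_{6}C_{B}}\bigr).
\]

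The main technical point is not any single calculation but the delicate interplay between the constants $c_{1},\ldots,c_{6}$ from \Cref{lem-s22-attn-cyc} and $C_{B}$ from \Cref{assumption:output-bound}. The threshold $0.0001$ in \Cref{induction-s22-cyc} was already tuned in \Cref{lem-s22-attn-cyc} to enforce $2C_{B}(c_{1}+c_{2})<1-c_{6}C_{B}$ and $1-4c_{5}C_{B}>0$, so all the exponents appearing above are genuinely sub-$1$; the proof will simply invoke these constraints at the end. The only other subtlety is verifying that the contribution from $j\notin\tau(\cY)$ stays at $\tilde O(m\sigma_{0}^{q})$ throughout stage 2.2, which follows from \Cref{assump:no-learning-x} plus the block-sparse attention assumption keeping those coordinates untouched.
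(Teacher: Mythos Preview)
Your proposal is correct and follows essentially the same approach as the paper: identify the one or two activated neurons via \Cref{lem-s22-act-cyc}, plug in the $\Lambda$ bounds from \Cref{lem-s22-lambda-cyc}, and bound the softmax ratio using the constraints $2C_B(c_1+c_2)<1-c_6C_B$ and $1-4c_5C_B>0$ from \Cref{lem-s22-attn-cyc}. One minor slip: in part (2) you write that the $\Omega(d)$ in the denominator comes ``from the other $j\in\tau(\cY)$,'' but there are only $n_y-2=O(\log d)$ of those; the mass comes from the $d-n_y$ indices $j\notin\tau(\cY)$ (which you already correctly listed in group (iii) at the outset), so this is just a labeling error and does not affect the argument.
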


\begin{proof}
    \begin{itemize}[left=0pt]
        \item For $\ell=1$, we have 
   \begin{align*}
    \logit^{(t)}_{5, j^{\prime}_1} 
    &= \frac{1}{1+e^{\Lambda^{(t)}_{5,j_1,r_{g_1\cdot y_0}}-\Lambda^{(t)}_{5,j'_1,r_{g_2\cdot y_0}}}+O(d)\cdot e^{-\Lambda^{(t)}_{5,j'_1,r_{g_2\cdot y_0}}}}\\
    &\stackrel{(a)}{\geq}  \frac{1}{1+e^{2(c_1+c_2)B}+O(d)\cdot e^{-\big(\frac{1}{3}-2c_1-2c_2\big)B}} \geq \Omega\Big(\frac{1}{d^{2C_B(c_1+c_2)}}\Big),
   \end{align*}
   where the inequality (a) follows from \Cref{lem-s22-lambda-cyc} and the last inequality is due to the fact that $(c_1+c_2)$ is some sufficiently small constant s.t., $ e^{-\big(\frac{1}{3}-2c_1-2c_2\big)B}=1/\poly d.$
   \item For $\ell=2$, we have 
   \begin{align*}
    \logit^{(t)}_{j_2}&= \frac{1}{1+e^{-\Lambda^{(t)}_{5,j_2,r_{g_2\cdot y_1}}+\Lambda^{(t)}_{5,j'_2,r_{g_2\cdot y_0}}}+O(d)\cdot e^{-\Lambda^{(t)}_{5,j_2,r_{g_2\cdot y_1}}}}\\
    &\stackrel{(a)}{\leq}  \frac{1}{1+O(d)\cdot e^{-4c_5B}} = O\bigg(\frac{1}{d^{1-4c_5C_B}}\bigg),
   \end{align*}
   where the inequality (a) follows from \Cref{lem-s22-lambda-cyc}. Similarly, we have
   \begin{align*}
    \logit^{(t)}_{j'_2}&= \frac{1}{1+e^{\Lambda^{(t)}_{5,j_2,r_{g_2\cdot y_1}}-\Lambda^{(t)}_{5,j'_2,r_{g_2\cdot y_0}}}+O(d)\cdot e^{-\Lambda^{(t)}_{5,j'_2,r_{g_2\cdot y_0}}}}\\
    & \leq \frac{1}{1+e^{\Omega(1)}+O(d)\cdot e^{-c_6B}} = O\bigg(\frac{1}{d^{1-c_6C_B}}\bigg).
   \end{align*}
    \end{itemize}
\end{proof}
\begin{lemma}\label{lem-s22-act-other-cyc}
     If \Cref{induction-s22-cyc} holds for all iterations $\in [T_{2,1,s}, t)$, given input $\Zb^{2,\ell-1}$, then we have
         \begin{enumerate}
        \item for $\ell=1$, if $\Zb^{2,\ell-1}\notin \cE_1$, then
  \begin{enumerate}[(a)]
            \item for $j=j_1$, $\hat{\fA}_{j_1}=\{r_{g_1\cdot y_0}\}$, $\Lambda^{(t)}_{5,j_1,r_{g_1\cdot y_0}}= B\pm O(\delta)$;
            \item for $j\neq j_1\in\tau(\Y)$, assuming $j=\tau(g_1(y))$, then $\Lambda^{(t)}_{5,j,r_{g_1\cdot y}}\in \bigg[\Big(\frac{1}{3}-c_2\Big)B,\Big(\frac{1}{3}+c_1\Big)B\bigg]$ and    $\Lambda^{(t)}_{5,j,r}\ll -\varrho$ for $r\in \hat{\fA}_{j}\setminus\{r_{g_1\cdot y}\}$.
        \end{enumerate}

       \item $\ell=2$, if  $\Zb^{2,\ell-1}\notin \tilde{\cE}_2$, then
                \begin{enumerate}[(a)]
                    \item if $g_1=g_2\wedge y_0\neq y_1$,
                    \begin{enumerate}
                        \item for $j=j_2$, $\Lambda^{(t)}_{5,j_2,r_{g_2\cdot y_1}}\in \Big[ \frac{1}{2}B+\Omega(1), \big(\frac{1}{2}+2c_5\big)B\Big]$ and $\Lambda^{(t)}_{5,j_2,r}\ll -\varrho$ for $r\in \hat{\fA}_{j_2}\setminus\{r_{g_2\cdot y_1}\}$;
                        \item for $j=\tau(g_2(y_0))$, $\Lambda^{(t)}_{5,j_2,r_{g_2\cdot y_1}}-\Lambda^{(t)}_{5,j_2,r_{g_2\cdot y_0}}\geq \Omega(1)$ and $\Lambda^{(t)}_{5,j,r}\ll -\varrho$ for $r\in \hat{\fA}_{j}\setminus\{r_{g_2\cdot y_0}\}$;
                        \item for other $j\in\tau(\cY)$, assuming $j=\tau(g_2(y))$,   $\Lambda^{(t)}_{5,j,r_{g_2\cdot y}}\in [\Omega(1),c_6 B]$ and $\Lambda^{(t)}_{5,j,r}\ll -\varrho$ for $r\in \hat{\fA}_{j}\setminus\{r_{g_2\cdot y}\}$.
                    \end{enumerate}
                  \item if $g_1\neq g_2\wedge y_0= y_1$,
                    \begin{enumerate}
                        \item for $j=j_2$, $\Lambda^{(t)}_{5,j_2,r_{g_2\cdot y_1}}\in \Big[ \frac{1}{2}B+\Omega(1), \big(\frac{1}{2}+2c_5\big)B\Big]$ and $\Lambda^{(t)}_{5,j_2,r}\ll -\varrho$ for $r\in \hat{\fA}_{j_2}\setminus\{r_{g_2\cdot y_1}\}$;
                        \item for $j=\tau(g_1(y_1))$, $\Lambda^{(t)}_{5,j_2,r_{g_2\cdot y_1}}-\Lambda^{(t)}_{5,j_2,r_{g_1\cdot y_1}}\geq \Omega(1)$ and $\Lambda^{(t)}_{5,j,r}\ll -\varrho$ for $r\in \hat{\fA}_{j}\setminus\{r_{g_1\cdot y_1}\}$;
                        \item for other $j\in\tau(\Y)$, $r$ is not activted for all $r\in \hat{\fA}_{j}$, i.e., $\Lambda^{(t)}_{5,j,r}\ll -\varrho$.
                    \end{enumerate}
                \item if $g_1=g_2\wedge y_0= y_1$,
                    \begin{enumerate}
                        \item for $j=j_2$, $ \hat{\fA}_{j_2}=\{r_{g_2\cdot y_1}\}$, $\Lambda^{(t)}_{5,j_2,r_{g_2\cdot y_1}}= B\pm O(\delta)$ ;
                        \item for other $j\in\tau(\cY)$, assuming $j=\tau(g_2(y))$,   $\Lambda^{(t)}_{5,j,r_{g_2\cdot y}}\in [\Omega(1),c_6 B]$ and $\Lambda^{(t)}_{5,j,r}\ll -\varrho$ for $r\in \hat{\fA}_{j}\setminus\{r_{g_2\cdot y}\}$.
                    \end{enumerate}
        \end{enumerate}
    \end{enumerate}
\end{lemma}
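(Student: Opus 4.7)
My plan is to treat this lemma as the Stage~2.2 analog of \Cref{lem-s21-act-other-cyc}, so the overall recipe is the same: combine the pre--activation decomposition of \Cref{lem-lambda-char} with the refined Stage~2.2 attention bounds from \Cref{lem-s22-attn-cyc} and the end-of-Stage-1 MLP feature shape recorded in \Cref{lem-prop-psi-cyc}. Concretely, for any activated neuron $r = r_{g\cdot y}\in \hat{\fA}_j$ and any input $\Zb^{2,\ell-1}$, I would expand
\[
\Lambda^{(t)}_{5,j,r}
=\sum_{\ell'=1}^{2}\attn^{(t)}_{\ans,\ell-1\to\pred,\ell'}V_{j,r}(g_{\ell'})
+\sum_{\ell'=1}^{\ell}\attn^{(t)}_{\ans,\ell-1\to\ans,\ell'-1}V_{j,r}(y_{\ell'-1})\pm\tilde{O}(\sigma_0),
\]
and then plug in the signs and magnitudes of the $V$-terms dictated by \eqref{attn-init-prop-cyc-1}--\eqref{attn-init-prop-cyc-4}, with attention weights pinned down by the ranges in \Cref{lem-s22-attn-cyc}.

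For $\ell=1$ on $\cE_1^{c}$ we have $g_1=g_2$, so for any $j=\tau(g_1(y))$ the two predicate attentions reinforce the same $V_{j,r_{g_1\cdot y}}(g_1)\approx B$ contribution, while the answer attention contributes $V_{j,r_{g_1\cdot y}}(y_0)\in\{+B,-B\}$ depending on whether $y=y_0$. When $y=y_0=y$ (i.e.\ $j=j_1$), all three attention weights multiply $+B$-scale features and sum to $1$, yielding $\Lambda=B\pm O(\delta)$ with $\hat\fA_{j_1}=\{r_{g_1\cdot y_0}\}$. When $y\neq y_0$, the two $g$-contributions are $\approx B$ but the $y_0$-contribution is $\approx-B$, and plugging in $\attn^{(t)}_{\ans,0\to\pred,\cdot}\in[\tfrac13-c_2,\tfrac13+c_1]$ and $\attn^{(t)}_{\ans,0\to\ans,0}\in[\tfrac13-c_3,\tfrac13-\Omega(1/\log d)]$ gives the advertised interval $[(\tfrac13-c_2)B,(\tfrac13+c_1)B]$. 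For every other neuron $r\in\hat\fA_j\setminus\{r_{g_1\cdot y}\}$, at least one of the $V_{j,r}(g_1)$ or $V_{j,r}(y_0)$ entries is $\approx-B$ and the opposite direction's entry is at most $O(\delta)$, so the weighted sum is $\le-\Omega(B)\ll-\varrho$.

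For $\ell=2$ I would split $\tilde\cE_2^{c}$ into the three stated subcases. In each subcase I identify the one or two $j$'s for which some $V$-sum remains near $+B$ and locate the unique activated neuron $r_{g\cdot y}$ inside each $\hat\fA_j$; all other neurons inside $\hat\fA_j$ again have at least one wrong-direction $V\approx-B$ absorbing the bulk of the attention mass and hence $\Lambda\ll-\varrho$. The size bounds then follow by inserting the Stage~2.2 attention ranges of \Cref{lem-s22-attn-cyc}: the total attention on the ``supporting'' clauses is at most $\tfrac12+2c_5$, giving the upper bound $(\tfrac12+2c_5)B$, while the $\Omega(1/\log d)$-level gap between $\attn^{(t)}_{\ans,1\to\pred,2}+\attn^{(t)}_{\ans,1\to\ans,1}$ and $\attn^{(t)}_{\ans,1\to\pred,1}+\attn^{(t)}_{\ans,1\to\ans,0}$, multiplied by $B=C_B\log d$, produces the $\Omega(1)$ lower bound and the strict separation $\Lambda^{(t)}_{5,j_2,r_{g_2\cdot y_1}}-\Lambda^{(t)}_{5,j_2,r_{g_2\cdot y_0}}\ge\Omega(1)$. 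The subcase $g_1=g_2\wedge y_0=y_1$ is cleanest since the predicates and answers all agree, collapsing $\hat\fA_{j_2}$ to a single neuron with $\Lambda=B\pm O(\delta)$.

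The step I expect to be the main obstacle is the quantitative control in subcase~(a) of $\ell=2$: I must verify both that $\Lambda^{(t)}_{5,j_2,r_{g_2\cdot y_1}}$ stays bounded above by $(\tfrac12+2c_5)B$ (so the logits cannot yet saturate and kill gradients in Stage~2.2) and that the separation $\Lambda^{(t)}_{5,j_2,r_{g_2\cdot y_1}}-\Lambda^{(t)}_{5,j_2,r_{g_2\cdot y_0}}\ge\Omega(1)$ holds using only the $\Omega(1/\log d)$ attention gap guaranteed by \Cref{induction-s22-cyc}(b). This requires the careful choice of the Stage~2.2 threshold so that the constants $c_1,\dots,c_6$ from \Cref{lem-s22-attn-cyc} satisfy $2C_B(c_1+c_2)<1-c_6C_B$ and $1-4c_5C_B>0$ (as already arranged in \Cref{lem-s22-attn-cyc}); once those constants are fixed, the remaining calculations are direct substitutions and do not require new ideas beyond those used for \Cref{lem-s21-act-other-cyc}.
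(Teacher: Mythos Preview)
Your proposal is correct and matches the paper's approach exactly: the paper omits the proof of this lemma, noting only that it proceeds analogously to the Stage~2.1 arguments in \Cref{sec-s21-attn-cyc} with \Cref{induction-s22-cyc} substituted in, which is precisely the recipe you describe (combine \Cref{lem-lambda-char}, \Cref{lem-s22-attn-cyc}, and \Cref{lem-prop-psi-cyc}). Your identification of the constant-juggling step as the main care point is also apt, and as you note, those constraints are already arranged in \Cref{lem-s22-attn-cyc}.
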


\begin{lemma}\label{lem-s22-logit-other-cyc}
     If \Cref{induction-s22-cyc} holds for all iterations $\in [T_{2,1,s}, t)$, given input $\Zb^{2,\ell-1}$, then we have
         \begin{enumerate}
        \item for $\ell=1$, if $\Zb^{2,\ell-1}\notin \cE_1$, then $1-\logit^{(t)}_{5,j_1}=O\Big(\frac{1}{\poly d}\Big)$.

       \item $\ell=2$, if  $\Zb^{2,\ell-1}\notin \tilde{\cE}_2$, then
                \begin{enumerate}[(a)]
                    \item if $g_1=g_2\wedge y_0\neq y_1$, $\logit^{(t)}_{5,j_2}=\Omega(1)$, $1-\logit^{(t)}_{5,j_2}-\logit^{(t)}_{5,\tau(g_2(y_0))}=\frac{1}{\poly d}$. 
                  \item if $g_1\neq g_2\wedge y_0= y_1$, $\logit^{(t)}_{5,j_2}=\Omega(1)$, $1-\logit^{(t)}_{5,j_2}-\logit^{(t)}_{5,\tau(g_1(y_1))}=\frac{1}{\poly d}$. 
                \item if $g_1=g_2\wedge y_0= y_1$, $1-\logit^{(t)}_{5,j_2}=O\Big(\frac{1}{\poly d}\Big)$.
        \end{enumerate}
    \end{enumerate}

\end{lemma}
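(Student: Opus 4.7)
The plan is to mirror the case-by-case softmax arguments already carried out in Lemmas~\ref{lem-s21-logit-other-cyc} and~\ref{lem-s22-logit-cyc}, now fed with the refined activation profile of Lemma~\ref{lem-s22-act-other-cyc}. In each of the four cases (item 1 and items 2a--2c), I would first read off the pre-activations $\Lambda^{(t)}_{5,j,r}$ at every relevant $(j,r)$, then assemble the raw logits $F^{(t)}_{5,j}(\Zb^{2,\ell-1})=\sum_{r\in[m]}\mathbf{sReLU}(\Lambda^{(t)}_{5,j,r})$, using that non-activated neurons contribute at most $\varrho(m/q-1)=o(1)$ and that $j\notin\tau(\cY)$ gives $F^{(t)}_{5,j}\le\tilde O(m\sigma_0^{\,q})=o(1)$. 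After that the claim reduces to a softmax estimate of the form $e^{F^{(t)}_{5,j}}/\sum_{j'}e^{F^{(t)}_{5,j'}}$, exactly as in Lemma~\ref{lem-s22-logit-cyc}.

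For item (1), the event $\Zb^{2,0}\notin\cE_1$ forces $g_1=g_2$, so Lemma~\ref{lem-s22-act-other-cyc}(1) yields $F^{(t)}_{5,j_1}=B\pm O(\delta)$, while for $j=\tau(g_1(y))\neq j_1$ one has $F^{(t)}_{5,j}\le(\tfrac13+c_1)B+o(1)$. A direct bound
\[
1-\logit^{(t)}_{5,j_1}\;\le\;\frac{O(n_y)\,e^{(\tfrac13+c_1)B}+O(d)}{e^{B-O(\delta)}}
\]
gives $O(1/\poly d)$ under the choice $B=C_B\log d$ with $C_B(\tfrac23-c_1)>1$, which is already enforced by the parameter constraints at the end of Lemma~\ref{lem-s22-attn-cyc}.

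Items (2a)--(2c) proceed by the same recipe. In (2a) ($g_1=g_2$, $y_0\neq y_1$), the dominant entries are $F^{(t)}_{5,j_2}\in[\tfrac12B+\Omega(1),(\tfrac12+2c_5)B]$ and $F^{(t)}_{5,\tau(g_2(y_0))}=F^{(t)}_{5,j_2}-\Omega(1)$, with every other $F^{(t)}_{5,j}$ bounded by $c_6 B+o(1)$. Under the constraints $2C_B(c_1+c_2)<1-c_6 C_B$ and $C_B>5$ the softmax computation yields $\logit^{(t)}_{5,j_2}=\Omega(1)$, and the remaining mass $1-\logit^{(t)}_{5,j_2}-\logit^{(t)}_{5,\tau(g_2(y_0))}$ is bounded above by $O(d\cdot e^{-(1/2-c_6)B})=O(1/\poly d)$. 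Item (2b) is the $y_0=y_1$ mirror image of (2a) with $\tau(g_1(y_1))$ playing the role of $\tau(g_2(y_0))$, and the same estimate applies verbatim. Item (2c) is the degenerate case $g_1=g_2$, $y_0=y_1$: only $r_{g_2\cdot y_1}$ is activated at class $j_2$ with $\Lambda=B\pm O(\delta)$, and the proof collapses to the same softmax computation as item (1).

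The only mildly delicate piece will be the bookkeeping of the constants $c_1,\dots,c_6$ from Lemma~\ref{lem-s22-attn-cyc}: one has to verify that the $O(1)$-gap between dominant and subdominant $F^{(t)}_{5,j}$ translates into a strict $\log d$-gap after multiplying by $C_B$, so that the subdominant contributions are truly $1/\poly d$ and not merely $o(1)$. This is a matter of checking inequalities like $C_B(\tfrac12-c_6)>1$ and $C_B(\tfrac23-c_1)>1$ against the already-declared regime $C_B\in(5,(q-1)/3)$, and is essentially routine once the constants in Lemma~\ref{lem-s22-attn-cyc} are pinned down. No new dynamical argument is required; the whole proof is a deterministic softmax estimate applied to the activation tables produced by Lemma~\ref{lem-s22-act-other-cyc}.
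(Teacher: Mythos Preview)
Your proposal is correct and takes essentially the same approach as the paper, which in fact omits the proof of this lemma entirely, noting at the start of the subsection that the arguments ``proceed analogously to the arguments in \Cref{sec-s21-attn-cyc}, with the induction hypothesis from \Cref{induction-s22-cyc} incorporated.'' Your plan of reading off the activation profile from Lemma~\ref{lem-s22-act-other-cyc} and feeding it into a softmax estimate, case by case, is precisely that analogy made explicit.
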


\subsubsection{Gradient Lemma}
\begin{lemma}\label{lem-s22-gd1-cyc}
    If \Cref{induction-s22-cyc} holds for all iterations $\in [T_{2,1,s}, t)$, given $s\in\tau(\X)$,  for $[\Qb^{(t)}_{4,3}]_{s,s}$, we have
    \begin{align*}
       \sum_{\ell=1}^2 \Big[-\nabla_{\Q^{(t)}_{4,3}}\Loss^{2,\ell}_{5}\Big]_{s,s}= \Theta\Big(\frac{\log d}{d}\Big).
    \end{align*}

\end{lemma}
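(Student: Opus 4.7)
The plan is to mirror the decomposition-based argument of \Cref{lem-s21-gd1-cyc}, but to identify a different dominant gradient source, because Stage~2.2 begins already with $\ell{=}1$ near-converged. I would write
\[
\sum_{\ell=1}^{2}\!\Big[-\nabla_{\Q^{(t)}_{4,3}}\Loss^{2,\ell}_{5}\Big]_{s,s}
\;=\;\sum_{\ell\in[2]}\sum_{\kappa\in\{\rom1,\rom2,\rom3\}}\cN^{(t)}_{s,3,\ell,\kappa},
\]
and estimate each of the six pieces using \Cref{lem-s22-attn-cyc,lem-s22-act-cyc,lem-s22-lambda-cyc,lem-s22-logit-cyc} on the high-probability events $\cE_1,\tilde\cE_2$ together with the off-event counterparts \Cref{lem-s22-act-other-cyc,lem-s22-logit-other-cyc}. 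As in Stage~2.1, the two $\cN_{s,3,\ell,\rom3}$ terms collapse to $O(1/\poly d)$ because the only contribution from $j\notin\tau(\cY)$ comes from $\tO(\sigma_0)$-sized inner products paired with an arbitrarily small logit.

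Next I would show that the $\ell{=}1$ pieces are no longer the driver. On $\cE_1$, \Cref{lem-s22-logit-cyc}(1) gives $1-\logit^{(t)}_{5,j_1}\le O(1/d^{2C_B(c_1+c_2)})$ and $\logit^{(t)}_{5,j_1'}$ of the same order, so $\cN^{(t)}_{s,3,1,\rom1}$ and $\cN^{(t)}_{s,3,1,\rom2}$ are each at most $\Theta(1/d)\cdot\tilde O(1/d^{2C_B(c_1+c_2)})\cdot\Theta(B) = \tilde O(\log d/d^{1+2C_B(c_1+c_2)})$; off $\cE_1$, \Cref{lem-s22-logit-other-cyc} forces $1-\logit_{j_1}\le 1/\poly d$ and so the off-event contribution is negligible. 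Hence the total $\ell{=}1$ contribution is $o(\log d/d)$, and the dominant signal must come from $\ell{=}2$. On $\tilde\cE_2$, \Cref{lem-s22-attn-cyc}(2)(a) gives $\attn^{(t)}_{\ans,1\to\pred,2}=\Theta(1)$; \Cref{lem-s22-lambda-cyc}(2)(a) places $\Lambda^{(t)}_{5,j_2,r_{g_2\cdot y_1}}$ in the linear regime, so $\ReLU'(\Lambda)=1$ and $V_{j_2,r_{g_2\cdot y_1}}(g_2)-\Lambda^{(t)}_{5,j_2,r_{g_2\cdot y_1}}\ge (1-4c_5)B-O(\delta)=\Theta(\log d)$; and \Cref{lem-s22-logit-cyc}(2) gives $1-\logit^{(t)}_{5,j_2}=\Omega(1)$. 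Combined with $\Pr\{\tau(x_1)=s\}\cap\tilde\cE_2=\Theta(1/d)$, this yields $\cN^{(t)}_{s,3,2,\rom1}=\Theta(\log d/d)$, matching both the upper and lower bounds claimed.

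The only nontrivial step is controlling $\cN^{(t)}_{s,3,2,\rom2}$, since for $j=j_2'=\tau(g_2(y_0))$ the neuron $r_{g_2\cdot y_0}$ also fires in the linear regime with $V_{j_2',r_{g_2\cdot y_0}}(g_2)-\Lambda^{(t)}_{5,j_2',r_{g_2\cdot y_0}}=\Theta(B)$, and this term enters with a minus sign. The key cancellation-prevention mechanism is \Cref{lem-s22-logit-cyc}(2), which gives $\logit^{(t)}_{5,j_2'}\le O(1/d^{1-c_6C_B})$; by the constant tuning stipulated in \Cref{lem-s22-attn-cyc} ($1-c_6C_B>0$), the resulting bound $O(\log d/d^{2-c_6C_B})$ is strictly $o(\log d/d)$. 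The other summands in $\cN^{(t)}_{s,3,2,\rom2}$ with $j\in\tau(\cY)\setminus\{j_2,j_2'\}$ are not activated on $\tilde\cE_2$ (\Cref{lem-s22-act-cyc}), and the off-$\tilde\cE_2$ contribution is handled by \Cref{lem-s22-logit-other-cyc} via the same argument used in \Cref{lem-s21-gd1-cyc}. The main obstacle is precisely this constant-tracking across the attention, $\Lambda$ and logit lemmas to confirm that the $j_2'$ pathway decays polynomially faster than the $j_2$ pathway grows; once the exponent inequality $1-c_6C_B>2C_B(c_1+c_2)$ from \Cref{lem-s22-attn-cyc} is invoked, everything else is a routine combination of the Stage~2.2 preliminaries.
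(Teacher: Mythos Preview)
Your approach is essentially the paper's, but there is a genuine gap in your treatment of the $\ell=1$ pieces. You claim that \Cref{lem-s22-logit-cyc}(1) gives $1-\logit^{(t)}_{5,j_1}\le O(1/d^{2C_B(c_1+c_2)})$; in fact that lemma states $\logit^{(t)}_{5,j_1'}\ge\Omega(1/d^{2C_B(c_1+c_2)})$, a \emph{lower} bound. The only upper bound available comes from the lower inequality in \Cref{lem-s22-lambda-cyc}(1)(b), namely $\Lambda^{(t)}_{5,j_1,r_{g_1\cdot y_0}}-\Lambda^{(t)}_{5,j_1',r_{g_2\cdot y_0}}\ge\Omega(1)$, which only yields $\logit^{(t)}_{5,j_1'}\le e^{-\Omega(1)}=O(1)$. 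Early in Stage~2.2 the attention gap is still only $\Theta(1/\log d)$, so $1-\logit^{(t)}_{5,j_1}$ can genuinely be $\Theta(1)$ and hence $\cN^{(t)}_{s,3,1,\rom1},\cN^{(t)}_{s,3,1,\rom2}$ can each be $\Theta(\log d/d)$, not $o(\log d/d)$.

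The fix is the one the paper uses: do not try to shrink the $\ell=1$ pieces, but instead observe they are nonnegative (up to $1/\poly d$). On $\cE_1$, exactly as in \Cref{lem-s21-gd1-cyc}, $V_{j_1,r_{g_1\cdot y_0}}(g_1)-\Lambda^{(t)}_{5,j_1,r_{g_1\cdot y_0}}\ge 0$ and $V_{j_1',r_{g_2\cdot y_0}}(g_1)-\Lambda^{(t)}_{5,j_1',r_{g_2\cdot y_0}}\le -\Omega(B)$, so both $\cN^{(t)}_{s,3,1,\rom1}$ and $\cN^{(t)}_{s,3,1,\rom2}$ are $\ge 0$; the off-$\cE_1$ part is $1/\poly d$ by \Cref{lem-s22-logit-other-cyc}. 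Then the lower bound follows from your (correct) $\cN^{(t)}_{s,3,2,\rom1}=\Theta(\log d/d)$ alone, and the upper bound from the crude $O(\log d/d)$ on each piece. Your $\ell=2$ analysis is fine; for $\cN^{(t)}_{s,3,2,\rom2}$ you only need $1-c_6C_B>0$ to conclude it is $o(\log d/d)$, so the stronger inequality $1-c_6C_B>2C_B(c_1+c_2)$ is unnecessary here.
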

\begin{proof}
The proof is similar to \Cref{lem-s21-gd1-cyc}, but we need to shift our focus to $\Big[-\nabla_{\Q^{(t)}_{4,3}}\Loss^{2,2}_{5}\Big]_{s,s}$. By \Cref{lem-s22-act-cyc} and \Cref{lem-s22-act-other-cyc}, we have
    \begin{align*}
       \cN^{(t)}_{s,3,2,\rom1}&= \E\Bigg[
    \attn^{(t)}_{{\ans,1} \rightarrow \pred,2} \cdot (1-\logit^{(t)}_{5,j_2})\cdot \\
    &~~~~~~~~~  \Big(
 \Big( V_{j_2,  r_{g_2\cdot y_1}}(g_2)- \Lambda^{(t)}_{5,j_2, r_{g_2\cdot y_1}}\pm\tilde{O}(\sigma_0) \Big)\pm \tilde{O}(\delta^{q}) \Big) 
    \1_{\{\tau(x_1)=s\}\cap \tilde{\cE}_2}\Bigg]\\
    &+ \E\Bigg[
    \attn^{(t)}_{{\ans,1} \rightarrow \pred,2} \cdot (1-\logit^{(t)}_{5,j_2})\cdot \\
    &~~~~~~~~~  \Big(
 \Big( V_{j_2,  r_{g_2\cdot y_1}}(g_2)- \Lambda^{(t)}_{5,j_2, r_{g_2\cdot y_1}}\pm\tilde{O}(\sigma_0) \Big)\pm \tilde{O}(\delta^{q}) \Big) 
    \1_{\{\tau(x_1)=s\}\cap \tilde{\cE}^c_2}\Bigg]\\
    &\stackrel{(a)}{=}\Theta\big(\frac{1}{d}\big)\cdot \Omega(1)\cdot \Theta(B)+ \Theta(\frac{1}{d})\cdot O(1)\cdot  \Theta(B)\cdot O(\frac{1}{\log d})\\
    &=\Theta\Big(\frac{\log d}{d}\Big),
\end{align*}
where the inequality (a) follows from \Cref{lem-s22-logit-cyc} and \Cref{lem-s22-act-other-cyc}.

Furthermore, for $\cN^{(t)}_{s,3,2,\rom2}$, we have
\begin{align*}
       &\Big|\cN^{(t)}_{s,3,2,\rom2}\Big|\\
       &\leq  \Bigg|\E\bigg[
    \attn^{(t)}_{{\ans,1} \rightarrow \pred,2} \cdot\logit^{(t)}_{5,j'_2} \cdot \\
    &~~~~~~~~~~\Big(\Big( V_{j'_2, r_{g_2\cdot y_0}}(g_2)- \Lambda^{(t)}_{5,j,r}\pm\tilde{O}(\sigma_0) \Big)\pm \tilde{O}(\delta^{q}) \Big) 
\1_{\{\tau(x_1)=s\}\cap \tilde{\cE}_2}\bigg]\Bigg|\\
&+\Bigg|\E\bigg[
    \attn^{(t)}_{{\ans,1} \rightarrow \pred,2} \cdot\sum_{j\neq j_2\in\tau(\cY)}\logit^{(t)}_{5,j} \cdot \\
    &~~~~~~~~~~\Big(\sum_{r\in\hat{\fA}_{j}}\ReLU^{\prime}(\Lambda^{(t)}_{5,j,r
    })\cdot  \Big( V_{j, r}(g_2)- \Lambda^{(t)}_{5,j,r}\pm\tilde{O}(\sigma_0) \Big)\pm \tilde{O}(\delta^{q}) \Big) 
\1_{\{\tau(x_1)=s\}\cap \tilde{\cE}^c_2}\bigg]\Bigg|\\
&\leq \Theta\big(\frac{1}{d}\big)\cdot O\Big(\frac{1}{d^{1-c_6C_B }}\Big)\cdot \Theta(B)+ \Theta\big(\frac{1}{d}\big)\cdot O(1)\cdot  \Theta(B)\cdot O\Big(\frac{1}{\log d}\Big)\leq O(\frac{1}{d}). 
\end{align*}
 $|\cN^{(t)}_{s,3,1,\rom1}|$ and $|\cN^{(t)}_{s,3,2,\rom1}|$ can be upper bounded by $O\big(\frac{\log d}{d}\big)$ as \Cref{lem-s21-gd1-cyc}. Thus, we complete the proof.
\end{proof}
\begin{lemma}\label{lem-s22-gd2-cyc}
    If \Cref{induction-s22-cyc} holds for all iterations $\in [T_{2,1,s}, t)$, given $s\in\tau(\X)$, we have
    \begin{align*}
\sum_{\ell=1}^2 \Big[-\nabla_{\Q^{(t)}_{4,4}}\Loss^{2,\ell}_{5}\Big]_{s,s} =\Theta\Big(\frac{\log d}{d}\Big).
    \end{align*}

\end{lemma}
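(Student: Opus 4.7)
The plan is to mirror the template from the proof of \Cref{lem-s22-gd1-cyc} but routed to the $\Q_{4,4}$ block. First I would expand $\sum_{\ell=1}^{2}[-\nabla_{\Q^{(t)}_{4,4}}\Loss^{2,\ell}_{5}]_{s,s}=\sum_{\ell\in[2]}\sum_{\kappa\in\{\rom1,\rom2,\rom3\}}\cN^{(t)}_{s,4,\ell,\kappa}$ via \eqref{eq-def-N-s-4-1-1}--\eqref{eq-def-N-s-4-2-3}, immediately discard the $\kappa=\rom3$ pieces as $O(1/\poly d)$ (since for $j\notin\tau(\cY)$ we have $\logit^{(t)}_{5,j}=O(1/d^{\Omega(1)})$), and handle the remaining six terms with the attention, activation, logit, and lambda bounds collected in \Cref{lem-s22-attn-cyc,lem-s22-act-cyc,lem-s22-lambda-cyc,lem-s22-logit-cyc,lem-s22-act-other-cyc,lem-s22-logit-other-cyc}.

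The dominant contribution is $\cN^{(t)}_{s,4,2,\rom1}$. On $\{\tau(x_1)=s\}\cap\tilde{\cE}_2$, \Cref{lem-s22-lambda-cyc} places $\Lambda^{(t)}_{5,j_2,r_{g_2\cdot y_1}}$ in $[\Omega(1),4c_5B]$, so the activation sits in the linear regime of $\mathbf{sReLU}$ and $\ReLU^{\prime}=1$; combined with $V_{j_2,r_{g_2\cdot y_1}}(y_1)\geq B-O(\delta)$ from \Cref{lem-prop-psi-cyc}, one gets $V-\Lambda\geq(1-4c_5)B-O(\delta)=\Omega(B)$, which is non-trivial precisely because the constants in \Cref{lem-s22-attn-cyc} were chosen so that $1-4c_5C_B>0$. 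Together with $1-\logit^{(t)}_{5,j_2}=\Omega(1)$ from \Cref{lem-s22-logit-cyc}, $\attn^{(t)}_{\ans,1\to\ans,1}=\Theta(1)$ from \Cref{lem-s22-attn-cyc}, and $\Pr(\{\tau(x_1)=s\}\cap\tilde{\cE}_2)=\Theta(1/d)$, this yields both the lower bound $\cN^{(t)}_{s,4,2,\rom1}\geq\Omega(\log d/d)$ and the matching upper bound $O(\log d/d)$, since each factor is uniformly controlled.

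For the remaining terms I would show each is $o(\log d/d)$ so that they cannot erase the main contribution. The $\ell=1$ pair $\cN^{(t)}_{s,4,1,\rom1}+\cN^{(t)}_{s,4,1,\rom2}$ is handled by the pairing trick from the proof of \Cref{lem-s21-gd2-cyc}: on $\cE_1$, substituting $\logit^{(t)}_{5,j'_1}=(1-\logit^{(t)}_{5,j_1})-1/\poly d$ collapses the sum into an expectation of $(1-\logit^{(t)}_{5,j_1})\big[V_{j_1,r_{g_1\cdot y_0}}(y_0)-V_{j'_1,r_{g_2\cdot y_0}}(y_0)+\Lambda^{(t)}_{5,j'_1,r_{g_2\cdot y_0}}-\Lambda^{(t)}_{5,j_1,r_{g_1\cdot y_0}}\big]$; the $V$-difference is $O(\delta)$ by \Cref{lem-prop-psi-cyc}, and the Gibbs coupling $\logit^{(t)}_{5,j'_1}/\logit^{(t)}_{5,j_1}\approx e^{\Lambda^{(t)}_{5,j'_1,r_{g_2\cdot y_0}}-\Lambda^{(t)}_{5,j_1,r_{g_1\cdot y_0}}}$ forces $(1-\logit^{(t)}_{5,j_1})\cdot|\Lambda\text{-gap}|=O(1)$ across the entire phase, leading to $O(1/d)$ after multiplying by $\Pr(\tau(x_0)=s)=\Theta(1/d)$; the $\cE_1^c$ contribution is absorbed using \Cref{lem-s22-logit-other-cyc}. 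For $\cN^{(t)}_{s,4,2,\rom2}$, the feature cancellation $V_{j'_2,r_{g_2\cdot y_0}}(y_1)\approx -B$ from \eqref{attn-init-prop-cyc-2} makes the outer minus sign produce a small positive contribution of order $O(\log d/d^{2-c_6C_B})=o(\log d/d)$, using $\logit^{(t)}_{5,j'_2}=O(1/d^{1-c_6C_B})$ from \Cref{lem-s22-logit-cyc} and the design choice $c_6C_B<1$.

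The main obstacle will be the $\ell=1$ pair: a naive estimate of each expectation gives $O(B/d)=O(\log d/d)$, which a priori matches the main term and could cancel it. The resolution is the Gibbs identity above, which shows that although either $(1-\logit^{(t)}_{5,j_1})$ or $|\Lambda_{j_1}-\Lambda_{j'_1}|$ can individually be as large as $\Omega(1)$ or $\Omega(\log d)$, their product is always $O(1)$, thereby reducing the pair to $O(1/d)$. Once this is in place, collecting all pieces gives $\sum_{\ell=1}^{2}[-\nabla_{\Q^{(t)}_{4,4}}\Loss^{2,\ell}_{5}]_{s,s}=\cN^{(t)}_{s,4,2,\rom1}+o(\log d/d)=\Theta(\log d/d)$, as claimed.
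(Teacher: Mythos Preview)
Your proposal is correct and follows the same overall architecture the paper intends: the paper's proof is simply ``similar analysis to \Cref{lem-s21-gd4-cyc}'', i.e., isolate $\cN^{(t)}_{s,4,2,\rom1}$ as the $\Theta(\log d/d)$ main term (linear-regime activation on $\tilde\cE_2$, $1-\logit^{(t)}_{5,j_2}=\Omega(1)$) and show the remaining pieces are lower order.

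The one place you diverge is in controlling the $\ell=1$ pair. The paper's template, carried over from \Cref{lem-s21-gd2-cyc}, would bound the inner bracket by the Stage~2.2 Lambda-gap estimate $\Lambda^{(t)}_{5,j_1,r_{g_1\cdot y_0}}-\Lambda^{(t)}_{5,j_1',r_{g_2\cdot y_0}}\le 2(c_1+c_2)B$ from \Cref{lem-s22-lambda-cyc}, giving a bound of order $(c_1+c_2)\cdot \log d/d$; since $c_1,c_2$ are small constants (tied to the $0.0001$ threshold), this is dominated by the main term. Your Gibbs-coupling argument $(1-\logit^{(t)}_{5,j_1})\cdot\Delta\approx e^{-\Delta}\Delta=O(1)$ instead yields the sharper, constant-free bound $O(1/d)$ for the pair. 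Both routes close the lemma; yours has the advantage of not leaning on the smallness of $c_1,c_2$ for this step, while the paper's route is the more direct continuation of the Stage~2.1 bookkeeping.
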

\begin{proof}
    The proof follows a similar analysis to \Cref{lem-s21-gd4-cyc}, and we thus omit the details here.
\end{proof}

\begin{lemma}\label{lem-s22-gd3-cyc}
    If \Cref{induction-s22-cyc} holds for all iterations $\in [T_{2,1,s}, t)$, given $s\in\tau(\X)$,  we have
    \begin{align*}
     &\sum^{t}_{t'=T_{2,1,s}}   \bigg(\sum_{\ell=1}^2 \Big[-\nabla_{\Q^{(t')}{4,3}}\Loss^{2,\ell}_{5}\Big]_{s,s}- \sum_{\ell=1}^2 \Big[-\nabla_{\Q^{(t')}_{4,4}}\Loss^{2,\ell}_{5}\Big]_{s,s}\bigg) \\
     &~~~~~~~~\geq  -O\big(\frac{1}{\log d}\big)\cdot \bigg(\sum^{t}_{t'=T_{2,1,s}}\sum_{\ell=1}^2 \Big[-\nabla_{\Q^{(t')}_{4,4}}\Loss^{2,\ell}_{5}\Big]_{s,s}\bigg).
    \end{align*}

\end{lemma}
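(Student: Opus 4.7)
The plan is to reduce the cumulative inequality to a per-iteration one and then sum. Specifically, I would show that for every single iteration $t' \in [T_{2,1,s}, t)$,
\begin{equation*}
\sum_{\ell=1}^{2} \Big[-\nabla_{\Q^{(t')}_{4,3}}\Loss^{2,\ell}_{5}\Big]_{s,s} - \sum_{\ell=1}^{2} \Big[-\nabla_{\Q^{(t')}_{4,4}}\Loss^{2,\ell}_{5}\Big]_{s,s} \;\geq\; -\frac{C}{\log d}\cdot\sum_{\ell=1}^{2} \Big[-\nabla_{\Q^{(t')}_{4,4}}\Loss^{2,\ell}_{5}\Big]_{s,s}
\end{equation*}
for some absolute constant $C>0$, after which summing over $t'$ gives the claim. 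To establish this per-iteration inequality, the natural approach is to apply the $\cN$-decompositions \eqref{eq-def-N-s-3-1-1}--\eqref{eq-def-N-s-4-2-3} and compare matching pieces $\cN^{(t')}_{s,3,\ell,\kappa}$ vs.\ $\cN^{(t')}_{s,4,\ell,\kappa}$ separately for each $\ell \in \{1,2\}$ and $\kappa \in \{\rom1,\rom2,\rom3\}$.

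The easiest pieces are the $\rom3$ terms, which by \Cref{lem-s22-logit-cyc} and \Cref{lem-s22-logit-other-cyc} are of order $O(1/\poly d)$ exactly as in the proof of \Cref{lem-s22-gd1-cyc}, hence negligible compared to $\Theta(\log d/d)$. For the $\rom1$ pieces, the target feature symmetry $V_{j,r_{g\cdot y}}(g) = V_{j,r_{g\cdot y}}(y) \pm O(\delta)$ from \eqref{attn-init-prop-cyc-1} makes the $V$-dependent factor essentially common to $\Q_{4,3}$ and $\Q_{4,4}$, so the difference $\cN^{(t')}_{s,3,\ell,\rom1} - \cN^{(t')}_{s,4,\ell,\rom1}$ reduces to the attention-weight gap $\attn_{\ans,\ell-1\to\pred,\ell} - \attn_{\ans,\ell-1\to\ans,\ell-1}$ times a positive factor; this is non-negative by \Cref{lem-s22-attn-cyc}(a)(d), up to an additive error of order $O(\delta/d)$.

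The crucial $\rom2$ pieces carry \emph{opposite} signs on $\Q_{4,3}$ and $\Q_{4,4}$ by the feature cancellation \eqref{attn-init-prop-cyc-2}--\eqref{attn-init-prop-cyc-3}: the spurious-logit coefficient $V_{j'_\ell,r_{g\cdot y}}(g) \approx -B$ contributes positively to $\cN^{(t')}_{s,3,\ell,\rom2}$, while $V_{j'_\ell,r_{g\cdot y}}(y) \approx +B$ contributes negatively to $\cN^{(t')}_{s,4,\ell,\rom2}$, so their difference is lower bounded by a quantity of order $\Omega(\log d/d)$ on the high-probability events $\cE_1, \tilde\cE_2$. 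The complementary events $\cE_1^c, \tilde\cE_2^c$ have probability $O(1/\log d)$ and yield at worst a correction of order $-O(1/(d\log d))$. Combining all pieces, the per-iteration gap is at least $-O(1/(d\log d))$, which is $-O(1/\log d)$ times the $\Theta(\log d/d)$ lower bound on $\sum_\ell [-\nabla_{\Q^{(t')}_{4,4}}\Loss^{2,\ell}_{5}]_{s,s}$ from \Cref{lem-s22-gd2-cyc}, proving the per-iteration form.

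The main technical hurdle is the $\rom2$ analysis for $\ell=1$: unlike Stage 2.1, the spurious logit $\logit_{5,j'_1} = \Omega(d^{-2C_B(c_1+c_2)})$ is no longer $O(1/\poly d)$ (\Cref{lem-s22-logit-cyc}), so one cannot drop it. Instead one must retain its contribution and verify that the negative drift on $\cN^{(t')}_{s,4,1,\rom2}$ driven by $+V_{j'_1,r_{g_2\cdot y_0}}(y_0) \approx B$ is strictly smaller in magnitude than the positive drift on $\cN^{(t')}_{s,3,1,\rom2}$ driven by $-V_{j'_1,r_{g_2\cdot y_0}}(g_1) \approx -B$, after accounting for the attention-weight advantage $\attn_{\ans,0\to\pred,1} \geq \attn_{\ans,0\to\ans,0} + \Omega(1/\log d)$ that was built up in Stage 2.1 and is preserved by \Cref{induction-s22-cyc}. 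This is exactly the feedback mechanism advertised in the proof overview: the gap created in Stage 2.1 self-sustains throughout Stage 2.2 because the $\ell=1$ loss keeps pushing $\Q_{4,3}$ relative to $\Q_{4,4}$ even as $\ell=2$ gradients become dominant.
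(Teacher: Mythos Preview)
Your proposal is essentially correct and follows the same approach as the paper: reduce to a per-iteration bound via the $\cN$-decomposition, handle $\rom3$ as negligible, use the attention ordering $\attn_{\ans,\ell-1\to\pred,\ell}\ge\attn_{\ans,\ell-1\to\ans,\ell-1}$ together with the feature symmetry \eqref{attn-init-prop-cyc-1} for $\rom1$, and exploit the sign structure from \eqref{attn-init-prop-cyc-2}--\eqref{attn-init-prop-cyc-3} for $\rom2$, splitting into the high-probability events and their complements.

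One small quantitative slip: you claim the complementary events $\cE_1^c,\tilde\cE_2^c$ contribute at worst $-O(1/(d\log d))$, but the per-event magnitude is $\Theta(B/d)=\Theta(\log d/d)$, so multiplying by the $O(1/\log d)$ probability gives only $-O(1/d)$, not $-O(1/(d\log d))$. The paper's $\ell=2,\rom2$ analysis on $\tilde\cE_2^c$ indeed lands at $-O(1/d)$. This does not affect the conclusion, since the target lower bound is $-O(1/\log d)\cdot\Theta(\log d/d)=-O(1/d)$ anyway, but your intermediate per-iteration gap should read $\ge -O(1/d)$ rather than $\ge -O(1/(d\log d))$.
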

\begin{proof}
   Following the analogous analysis as \Cref{lem-s21-gd3-cyc}, we have
    \begin{align*}
           & \sum_{\ell=1}^2 \Big[-\nabla_{\Q^{(t)}_{4,3}}\Loss^{2,\ell}_{5}\Big]_{s,s}+ \sum_{\ell=1}^2 \Big[\nabla_{\Q^{(t)}_{4,4}}\Loss^{2,\ell}_{5}\Big]_{s,s}= \sum_{\ell\in [2]}\sum_{\kappa\in \{\rom1,\rom2,\rom3\}}\cN^{(t)}_{s,3,\ell,\kappa}-\cN^{(t)}_{s,4,\ell,\kappa}.
    \end{align*}
Meanwhile $|\cN^{(t)}_{s,p,\ell,\rom3}|=O(\frac{1}{\poly d})$ for $p\in\{3,4\}$ and $\ell\in[2]$, we can focus on the gradient difference between $\Q^{(t)}_{4,3}$ and $\Q^{(t)}_{4,4}$ contributed by other terms.

For $\ell=1$, since $\attn^{(t)}_{{\ans,0} \rightarrow \pred,1} \geq \attn^{(t)}_{{\ans,0} \rightarrow \ans,0}$, and thus it is straightforward to see that $\cN^{(t)}_{s,3,1,\rom1}-\cN^{(t)}_{s,4,1,\rom1}\geq 0$. Furthermore, we have
   \begin{align*}
     & \cN^{(t)}_{s,3,1,\rom2}-\cN^{(t)}_{s,4,1,\rom2}\\
     &= -\E\Bigg[
    \attn^{(t)}_{{\ans,0} \rightarrow \pred,1} \cdot \logit^{(t)}_{5,j'_1} \cdot \\
    &~~~~~~~~~~\Big(  \Big( V_{j'_1,r_{g_2\cdot y_0}}(g_1)- \Lambda^{(t)}_{5,j'_1,r_{g_2\cdot y_0}}\pm\tilde{O}(\sigma_0) \Big)\pm \tilde{O}(\delta^{q}) \Big) 
\1_{\{\tau(x_0)=s\}\cap \cE_1}\Bigg]\\
&+\E\Bigg[
    \attn^{(t)}_{{\ans,0} \rightarrow \ans,0} \cdot \logit^{(t)}_{5,j'_1} \cdot \\
    &~~~~~~~~~~\Big(  \Big( V_{j'_1,r_{g_2\cdot y_0}}(y_0)- \Lambda^{(t)}_{5,j'_1,r_{g_2\cdot y_0}}\pm\tilde{O}(\sigma_0) \Big)\pm \tilde{O}(\delta^{q}) \Big) 
\1_{\{\tau(x_0)=s\}\cap \cE_1}\Bigg]\\
& -\E\Bigg[
    \attn^{(t)}_{{\ans,0} \rightarrow \pred,1} \cdot\sum_{y\neq y_0\in\cY}\logit^{(t)}_{5,\tau(g_1(y))} \cdot \\
    &~~~~~~~~~~\Big( \Big( V_{\tau(g_1(y)), r_{g_1\cdot y}}(g_1)- \Lambda^{(t)}_{5,\tau(g_1(y)), r_{g_1\cdot y}}\pm\tilde{O}(\sigma_0) \Big)\pm \tilde{O}(\delta^{q}) \Big) 
\1_{\{\tau(x_0)=s\}\cap \cE^c_1}\Bigg]\\
& +\E\Bigg[
    \attn^{(t)}_{{\ans,0} \rightarrow \ans,0} \cdot\sum_{y\neq y_0\in\cY}\logit^{(t)}_{5,\tau(g_1(y))} \cdot \\
    &~~~~~~~~~~\Big( \Big( V_{\tau(g_1(y)), r_{g_1\cdot y}}(y_0)- \Lambda^{(t)}_{5,\tau(g_1(y)), r_{g_1\cdot y}}\pm\tilde{O}(\sigma_0) \Big)\pm \tilde{O}(\delta^{q}) \Big) 
\1_{\{\tau(x_0)=s\}\cap \cE^c_1}\Bigg]\\
&\stackrel{(a)}{\geq} \Theta(\frac{1}{d})\cdot \Omega\Big(\frac{1}{d^{2C_B(c_1+c_2)}}\Big)\cdot \Theta(B)- \Theta(\frac{1}{d})\cdot O(\frac{1}{\poly d})\cdot  O(B)\cdot O(\frac{1}{\log d})\geq \Omega\Big(\frac{\log d}{d^{1+2C_B(c_1+c_2)}}\Big).
\end{align*}
    where the  inequality (a) is due to \Cref{lem-s22-logit-cyc} and \Cref{lem-s22-logit-other-cyc}.

For $\ell=2$, since $\attn^{(t)}_{{\ans,\pred, 2} \rightarrow \pred,1} \geq \attn^{(t)}_{{\ans,1} \rightarrow \ans,1}$, and thus it is straightforward to see that $\cN^{(t)}_{s,3,2,\rom1}-\cN^{(t)}_{s,4,2,\rom1}\geq 0$. Moreover, we have
\begin{align}
       &\cN^{(t)}_{s,3,2,\rom2}-\cN^{(t)}_{s,4,2,\rom2}\notag\\
       &= -\E\Bigg[
    \attn^{(t)}_{{\ans,1} \rightarrow \pred,2} \cdot \logit^{(t)}_{5,j'_2} \cdot  \notag \\
    &~~~~~~~~~~\Big( \Big( V_{j'_2, r_{g_2\cdot y_0}}(g_2)- \Lambda^{(t)}_{5,j'_2,r_{g_2\cdot y_0}}\pm\tilde{O}(\sigma_0) \Big)\pm \tilde{O}(\delta^{q}) \Big) 
\1_{\{\tau(x_1)=s\}\cap \tilde{\cE}_2}\Bigg] \label{eq-s22-cyc-gd1}\\
&+\E\Bigg[
    \attn^{(t)}_{{\ans,1} \rightarrow \ans,1} \cdot\logit^{(t)}_{5,j'_2} \cdot \notag \\
    &~~~~~~~~~~\Big(\Big( V_{j'_2, r_{g_2\cdot y_0}}(y_1)- \Lambda^{(t)}_{5,j'_2,r_{g_2\cdot y_0}}\pm\tilde{O}(\sigma_0) \Big)\pm \tilde{O}(\delta^{q}) \Big) 
\1_{\{\tau(x_1)=s\}\cap \tilde{\cE}_2}\Bigg] \label{eq-s22-cyc-gd2}\\
&-\E\Bigg[
    \attn^{(t)}_{{\ans,1} \rightarrow \pred,2} \cdot\sum_{j\neq j_2\in\tau(\cY)}\logit^{(t)}_{5,j} \cdot \notag\\
    &~~~~~~~~~~\Big(\sum_{r\in\hat{\fA}_{j}}\ReLU^{\prime}(\Lambda^{(t)}_{5,j,r
    })\cdot  \Big( V_{j, r}(g_2)- \Lambda^{(t)}_{5,j,r}\pm\tilde{O}(\sigma_0) \Big)\pm \tilde{O}(\delta^{q}) \Big) 
\1_{\{\tau(x_1)=s\}\cap \tilde{\cE}^c_2}\Bigg] \label{eq-s22-cyc-gd3}\\
&+\E\Bigg[
    \attn^{(t)}_{{\ans,1} \rightarrow \ans,1} \cdot\sum_{j\neq j_2\in\tau(\cY)}\logit^{(t)}_{5,j} \cdot \notag\\
    &~~~~~~~~~~\Big(\sum_{r\in\hat{\fA}_{j}}\ReLU^{\prime}(\Lambda^{(t)}_{5,j,r
    })\cdot  \Big( V_{j, r}(y_1)- \Lambda^{(t)}_{5,j,r}\pm\tilde{O}(\sigma_0) \Big)\pm \tilde{O}(\delta^{q}) \Big) 
\1_{\{\tau(x_1)=s\}\cap\tilde{\cE}^c_2}\Bigg]. \label{eq-s22-cyc-gd4}
\end{align}
By \Cref{lem-s22-logit-cyc}, we obtain that
\begin{align*}
    \eqref{eq-s22-cyc-gd1}+\eqref{eq-s22-cyc-gd2}\geq  -O\big(\frac{1}{d}\big)\cdot O\Big(\frac{1}{d^{1-c_6C_B }}\Big)\cdot \Theta(B)\geq-O\Big(\frac{\log d}{d^{2-c_6C_B }}\Big).
\end{align*}
By \Cref{lem-s22-act-other-cyc}, for $\tilde{\cE}^c_2$, we only need to consider the case that $g_1=g_2\wedge y_0\neq y_1$, and we have
\begin{align*}
   &\eqref{eq-s22-cyc-gd3}+\eqref{eq-s22-cyc-gd4} \\&\geq -\E\Bigg[
    \attn^{(t)}_{{\ans,1} \rightarrow \pred,2} \cdot \sum_{y\neq y_1\in\cY}\logit^{(t)}_{5,\tau(g_2(y))}  \cdot \notag \\
    &~~~\Big(\Big( V_{\tau(g_2(y)), r_{g_2\cdot y}}(g_2)- \Lambda^{(t)}_{5,\tau(g_2(y)),r_{g_2\cdot y}}\pm\tilde{O}(\sigma_0) \Big)\pm \tilde{O}(\delta^{q}) \Big) 
\1_{\{\tau(x_1)=s\}\cap \{g_1=g_2\wedge y_0\neq y_1\}}\Bigg] \\
&+\E\Bigg[
    \attn^{(t)}_{{\ans,1} \rightarrow \ans,1} \cdot\sum_{y\neq y_1\in\cY}\logit^{(t)}_{5,\tau(g_2(y))} \cdot \notag\\
    &~~~\Big( \Big( V_{\tau(g_2(y)), r_{g_2\cdot y}}(y_1)- \Lambda^{(t)}_{5,\tau(g_2(y)),r_{g_2\cdot y}}\pm\tilde{O}(\sigma_0) \Big)\pm \tilde{O}(\delta^{q}) \Big) 
\1_{\{\tau(x_1)=s\}\cap \{g_1=g_2\wedge y_0\neq y_1\}}\Bigg]\\
&\geq -\Theta\Big(\frac{1}{d}\Big)\cdot O(1)\cdot \Theta(B)\cdot O\Big(\frac{1}{\log d}\Big)\geq - O\Big(\frac{1}{d}\Big).
\end{align*}
    Putting it all together, combining with the fact that $c_6$ and $(c_1+c_2)$ are sufficiently small,  we finish the proof.
\end{proof}
\begin{lemma}[Lower bound of gap]\label{lem-s22-gd4-cyc}
    If \Cref{induction-s22-cyc} holds for all iterations $\in [T_{2,1,s}, t)$, given $s \in\tau(\X)$, we have $[\Qb^{(t)}_{4,3}]_{s,s}-[\Qb^{(t)}_{4,4}]_{s,s}
    \geq \Omega\big(\frac{1}{\log d}\big)$.
\end{lemma}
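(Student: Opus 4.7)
Let $G_t := [\Qb^{(t)}_{4,3}]_{s,s}-[\Qb^{(t)}_{4,4}]_{s,s}$. The plan is to combine the lower bound on $G_{T_{2,1,s}}$ from Lemma~\ref{lem-end-s21-cyc} with the cumulative comparison in Lemma~\ref{lem-s22-gd3-cyc}, using gradient descent in telescoped form: for any $t\in[T_{2,1,s},T_{2,2,s}]$,
\begin{equation*}
    G_t \;=\; G_{T_{2,1,s}} \;+\; \eta\!\!\sum_{t'=T_{2,1,s}}^{t-1}\!\!\Bigl(\sum_{\ell=1}^{2}\bigl[-\nabla_{\Q^{(t')}_{4,3}}\Loss^{2,\ell}_{5}\bigr]_{s,s} - \sum_{\ell=1}^{2}\bigl[-\nabla_{\Q^{(t')}_{4,4}}\Loss^{2,\ell}_{5}\bigr]_{s,s}\Bigr).
\end{equation*}
Lemma~\ref{lem-s22-gd3-cyc} bounds the second term from below by $-\,C/\log d$ times the cumulative growth of $[\Qb_{4,4}]_{s,s}$, so the task reduces to controlling that growth uniformly on Stage~2.2.

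First I would invoke Lemma~\ref{lem-end-s21-cyc} to obtain $G_{T_{2,1,s}}\geq c^*/\log d$ for an explicit constant $c^*>0$. Second, I would use that by the update rule $\eta\sum_{t'=T_{2,1,s}}^{t-1}\sum_{\ell}[-\nabla_{\Q^{(t')}_{4,4}}\Loss^{2,\ell}_{5}]_{s,s}=[\Qb^{(t)}_{4,4}]_{s,s}-[\Qb^{(T_{2,1,s})}_{4,4}]_{s,s}$, which by Induction~\ref{induction-s22-cyc}(a) is bounded above by the Stage~2.2 ceiling $0.0001$. Combining with Lemma~\ref{lem-s22-gd3-cyc} gives
\begin{equation*}
    G_t \;\geq\; \frac{c^*}{\log d} \;-\; \frac{C\cdot 0.0001}{\log d}.
\end{equation*}
Choosing the Stage~2.2 threshold $0.0001$ small enough (this is the same smallness regime already imposed when calibrating $c_1,\dots,c_6$ in Lemma~\ref{lem-s22-attn-cyc}) so that $C\cdot 0.0001\leq c^*/2$, we conclude $G_t\geq c^*/(2\log d)=\Omega(1/\log d)$, completing the induction.

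The key obstacle is constant tracking: the argument only succeeds if the absolute constant $C$ produced by Lemma~\ref{lem-s22-gd3-cyc} times the Stage~2.2 ceiling is strictly dominated by the Stage~2.1 terminal gap constant $c^*$. Inspecting the proof of Lemma~\ref{lem-s22-gd3-cyc}, the leading positive contribution to the gap-growth rate is $\Omega(\log d/d^{1+2C_B(c_1+c_2)})$ from $\cN^{(t)}_{s,3,1,\rom2}-\cN^{(t)}_{s,4,1,\rom2}$, whereas the possibly-negative contributions are of strictly smaller order under the existing calibration $2C_B(c_1+c_2)<1-c_6C_B$. Thus $C$ is traceable and can be made uniform in $d$; the only calibration left is shrinking $0.0001$, which is harmless since the induction merely requires $[\Qb_{4,3}]_{s,s}$ stays below some fixed small constant. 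All other ingredients—the attention/logit preliminaries of Lemmas~\ref{lem-s22-attn-cyc}--\ref{lem-s22-logit-other-cyc} and the gradient bounds of Lemmas~\ref{lem-s22-gd1-cyc},~\ref{lem-s22-gd2-cyc}—are already in place and do not need to be reopened.
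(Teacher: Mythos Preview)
Your approach is correct and shares the core mechanism with the paper: both use Lemma~\ref{lem-s22-gd3-cyc} to bound the cumulative gap shrinkage by $O(1/\log d)$ times the total growth of $[\Qb_{4,4}]_{s,s}$, and then cap that growth. One minor correction: the $0.0001$ ceiling on $[\Qb^{(t)}_{4,4}]_{s,s}$ does not come from Induction~\ref{induction-s22-cyc}(a) (which only gives $O(1)$); it comes from Induction~(b) at iteration $t{-}1$ (yielding $[\Qb_{4,4}]_{s,s}<[\Qb_{4,3}]_{s,s}$) together with the definition of $T_{2,2,s}$ (yielding $[\Qb_{4,3}]_{s,s}<0.0001$). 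This is not circular, since the lemma hypothesis grants you the full induction for all $t'<t$.

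Where the two routes differ: the paper phrases the first step contrapositively, defining $\tilde{T}$ as the first time the gap halves and deducing via Lemma~\ref{lem-s22-gd3-cyc} that $[\Qb_{4,3}]_{s,s},[\Qb_{4,4}]_{s,s}$ would have to reach $\Omega(1)$ by then; it then runs a \emph{refined} per-step analysis for $t>\tilde{T}$ (exploiting the sharper logit bound $\logit_{5,\tau(g_2(y_0))}=O(d^{-\Omega(1)})$ available once the diagonals are large) to show the gap barely moves afterward. Your one-phase argument bypasses that refinement by feeding the $0.0001$ ceiling directly into Lemma~\ref{lem-s22-gd3-cyc}, at the cost of one additional smallness constraint on the threshold---of exactly the same type already imposed in Lemma~\ref{lem-s22-attn-cyc}. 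Within Stage~2.2 the paper's second phase is in fact vacuous (the $\Omega(1)$ growth it posits contradicts the $0.0001$ ceiling, so $\tilde{T}$ cannot occur before $T_{2,2,s}$), so the two arguments collapse to the same content; yours is simply the cleaner packaging for this particular lemma, while the paper's refined bound is presumably staged for reuse in Stage~2.3.
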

\begin{proof}
    Letting $\tilde{T}$ denote the first time that $[\Qb^{(t)}_{4,3}]_{s,s}-[\Qb^{(t)}_{4,4}]_{s,s}
    \leq \frac{1}{2}\Big( \Qb^{(T_{2,1,s})}_{4,3}]_{s,s}-[\Qb^{(T_{2,1,s})}_{4,4}]_{s,s}\Big)$, which implies that 
    $$\sum^{\tilde{T}}_{t'=T_{2,1,s}}   \bigg(\sum_{\ell=1}^2 \Big[-\nabla_{\Q^{(t')}{4,3}}\Loss^{2,\ell}_{5}\Big]_{s,s}- \sum_{\ell=1}^2 \Big[-\nabla_{\Q^{(t')}_{4,4}}\Loss^{2,\ell}_{5}\Big]_{s,s}\bigg).$$
    Hence, by \Cref{lem-s22-gd3-cyc}, we have $[\Qb^{(\tilde{T})}_{4,3}]_{s,s}$ and $[\Qb^{(\tilde{T})}_{4,3}]_{s,s}$ reaches $\Omega(1)$.  Thus, we can have a refined lower bound  for $\eqref{eq-s22-cyc-gd3}+\eqref{eq-s22-cyc-gd4}$ in \Cref{lem-s22-gd3-cyc}, and obtain:
    \begin{align*}
   \eqref{eq-s22-cyc-gd3}&+\eqref{eq-s22-cyc-gd4} \geq -O(\frac{1}{\poly d})\\&-\E\Bigg[
    \attn^{(t)}_{{\ans,1} \rightarrow \pred,2} \cdot \logit^{(t)}_{5,\tau(g_2(y_0))}  \cdot \notag \\
    &~~~\Big(\Big( V_{\tau(g_2(y_0)), r_{g_2\cdot y_0}}(g_2)- \Lambda^{(t)}_{5,\tau(g_2(y_0)),r_{g_2\cdot y_0}}\pm\tilde{O}(\sigma_0) \Big)\pm \tilde{O}(\delta^{q}) \Big) 
\1_{\{\tau(x_1)=s\}\cap \{g_1=g_2\wedge y_0\neq y_1\}}\Bigg] \\
&+\E\Bigg[
    \attn^{(t)}_{{\ans,1} \rightarrow \ans,1} \cdot \logit^{(t)}_{5,\tau(g_2(y_0))} \cdot \notag\\
    &~~~\Big( \Big( V_{\tau(g_2(y_0)), r_{g_2\cdot y_0}}(y_1)- \Lambda^{(t)}_{5,\tau(g_2(y_0)),r_{g_2\cdot y_0}}\pm\tilde{O}(\sigma_0) \Big)\pm \tilde{O}(\delta^{q}) \Big) 
\1_{\{\tau(x_1)=s\}\cap \{g_1=g_2\wedge y_0\neq y_1\}}\Bigg]\\
&\stackrel{(a)}{\geq} -O\Big(\frac{1}{\poly d}\Big) -\Theta\Big(\frac{1}{d}\Big)\cdot O\Big(\frac{1}{d^{\Omega(1)}}\Big)\cdot \Theta(B)\cdot O\Big(\frac{1}{\log d}\Big)\geq - O\Big(\frac{1}{d^{1+\Omega(1)}}\Big),
\end{align*}
where (a) follows from the fact that on the event \( \{g_1 = g_2 \wedge y_0 \neq y_1\} \), we have $\Lambda^{(t)}_{5,j_2,r_{g_2 \cdot y_1}} - \Lambda^{(t)}_{5,j_2,r_{g_2 \cdot y_0}} \geq \Omega(B)$
once \( [\Qb^{(t)}_{4,3}]_{s,s} \) and \( [\Qb^{(t)}_{4,4}]_{s,s} \) reach constant magnitude, and  consequently, the logit satisfies $\logit^{(t)}_{5,\tau(g_2(y_0))} \leq O\left( \frac{1}{d^{\Omega(1)}} \right).$

Therefore, 
$$\sum^{t}_{t'=\tilde{T}+1}   \bigg(\sum_{\ell=1}^2 \Big[-\nabla_{\Q^{(t')}{4,3}}\Loss^{2,\ell}_{5}\Big]_{s,s}- \sum_{\ell=1}^2 \Big[-\nabla_{\Q^{(t')}_{4,4}}\Loss^{2,\ell}_{5}\Big]_{s,s}\bigg)\geq -O\Big(\frac{1}{\log d\cdot  d^{\Omega(1)}}\Big)\cdot O(1),$$
which means that $[\Qb^{(t)}_{4,3}]_{s,s}-[\Qb^{(t)}_{4,4}]_{s,s}\geq \Omega\Big(\frac{1}{\log d}\Big)-O\Big(\frac{1}{\log d\cdot  d^{\Omega(1)}}\Big)\geq \Omega\Big(\frac{1}{\log d}\Big).$ 
\end{proof}
\begin{lemma}\label{lem-s22-gd5-cyc}
    If \Cref{induction-s22-cyc} holds for all iterations $\in [T_{2,1,s}, t)$, given $s'\not=s \in\tau(\X)$,  for $p\in\{3,4\}$, we have
    \begin{align*}
        \bigg|\sum_{\ell=1}^2 \Big[-\nabla_{\Q^{(t)}_{4,p}}\Loss^{2,\ell}_{5}\Big]_{s,s'}\bigg|\leq  O\Big(\frac{\log d}{d^2}\Big)=O\Big(\frac{1}{d}\Big)\cdot \bigg|\sum_{\ell=1}^2 \Big[-\nabla_{\Q^{(t)}_{4,p}}\Loss^{2,\ell}_{5}\Big]_{s,s}\bigg|.
    \end{align*}

\end{lemma}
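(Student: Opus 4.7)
\textbf{Proof plan for \Cref{lem-s22-gd5-cyc}.} The plan is to proceed exactly in parallel with the off-diagonal bound \Cref{lem-s21-gd5} proved in Stage 2.1, only now under the stronger control provided by \Cref{induction-s22-cyc} (and the already established diagonal lower bounds in \Cref{lem-s22-gd1-cyc,lem-s22-gd2-cyc}). The driving observation is that, in contrast to the diagonal entries whose gradient carries an indicator $\1_{\tau(x_0)=s}$ or $\1_{\tau(x_1)=s}$ of probability $\Theta(1/d)$, every off-diagonal entry $(s,s')$ with $s\neq s'$ is activated only through a two-variable matching indicator $\1_{\tau(x_0)=s,\tau(x_1)=s'}$ or $\1_{\tau(x_1)=s,\tau(x_0)=s'}$, which occurs with probability $\Theta(1/d^{2})$ under \Cref{assump:lego-data-distribution} (sampling without replacement).

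First I would read off the exact gradient expressions for the off-diagonal entries from \Cref{lem-refined-grad-Q43,lem-refined-grad-Q44}, which directly yield
\begin{align*}
\Big[-\nabla_{\Q^{(t)}_{4,3}}\Loss^{2,1}_5\Big]_{s,s'} & \propto \E\Big[\attn^{(t)}_{\ans,0\to\pred,2}\cdot (\cdots)\,\1_{\tau(x_0)=s,\tau(x_1)=s'}\Big],\\
\Big[-\nabla_{\Q^{(t)}_{4,3}}\Loss^{2,2}_5\Big]_{s,s'} & \propto \E\Big[\attn^{(t)}_{\ans,1\to\pred,1}\cdot (\cdots)\,\1_{\tau(x_1)=s,\tau(x_0)=s'}\Big],
\end{align*}
and similarly $[-\nabla_{\Q^{(t)}_{4,4}}\Loss^{2,1}_5]_{s,s'}\equiv 0$, while the $\ell=2$ piece for $\Qb_{4,4}$ carries the indicator $\1_{\tau(x_1)=s,\tau(x_0)=s'}$. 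Next I would absolutely bound the bracket $(\cdots)$ by reusing the ingredient bounds already established in Stage 2.2: the loss-derivative factor $|\Ecal_{5,j}|\le 1$, the sparse-activation structure of \Cref{lem-s22-act-cyc,lem-s22-act-other-cyc}, the uniform bound $|V_{j,r}(\cdot)-\Lambda^{(t)}_{5,j,r}|\le O(B)=O(\log d)$ from \Cref{lem-prop-psi-cyc,lem-s22-lambda-cyc}, and the vanishing $\tilde O(\sigma_0)+\tilde O(\delta^q)$ remainders. Combined with the attention factor $\attn\le 1$, the bracket is bounded by $O(\log d)$ on both $\cE$ and $\cE^c$ (the latter contributing an additional $O(1/\log d)$ through $\Pr(\cE_1^c),\Pr(\tilde\cE_2^c)$), exactly as in the diagonal analysis.

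Multiplying the $O(\log d)$ bracket against the $\Theta(1/d^{2})$ probability of the two-variable matching indicator gives
\[
\bigg|\sum_{\ell=1}^{2}\Big[-\nabla_{\Q^{(t)}_{4,p}}\Loss^{2,\ell}_5\Big]_{s,s'}\bigg| \;\le\; O\!\Big(\frac{\log d}{d^{2}}\Big), \qquad p\in\{3,4\},
\]
which is the first claimed bound. The second equality in the lemma statement then follows immediately from \Cref{lem-s22-gd1-cyc,lem-s22-gd2-cyc}, which together certify $|\sum_{\ell}[-\nabla_{\Q^{(t)}_{4,p}}\Loss^{2,\ell}_5]_{s,s}|=\Theta(\log d/d)$, yielding the ratio $O(1/d)$.

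I do not anticipate a genuine technical obstacle here: the lemma is an off-diagonal analogue of \Cref{lem-s21-gd5} and all constituent bounds have been prepared in \Cref{sec-s22-cyc}. The only mild subtlety is to verify that the sub-events $\cE_1^c$ and $\tilde\cE_2^c$ do not produce a hidden blow-up when intersected with the rare indicator $\1_{\tau(x_0)=s,\tau(x_1)=s'}$; this is handled by noting that the probability of the intersection is still at most $O(1/d^2)$, while the integrand remains $O(\log d)$ (with $\logit^{(t)}_{5,j}\le O(1)$ trivially), so the contribution from $\cE^c$ is dominated by the $\cE$ contribution and the combined bound $O(\log d/d^2)$ holds unconditionally.
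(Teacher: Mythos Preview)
Your proposal is correct and follows exactly the approach the paper intends: the paper does not even write out a proof for \Cref{lem-s22-gd5-cyc}, since it is the verbatim analogue of \Cref{lem-s21-gd5} under the Stage~2.2 hypotheses, relying on the refined gradient expressions in \Cref{lem-refined-grad-Q43,lem-refined-grad-Q44}, the $\Theta(1/d^2)$ probability of the two-variable matching event, and the diagonal lower bounds \Cref{lem-s22-gd1-cyc,lem-s22-gd2-cyc}. Your plan matches this line-by-line.
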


\subsubsection{At the End of Stage 2.2}
Putting gradient lemmas together, we can directly prove that \Cref{induction-s22-cyc} holds for all iterations $t$ until the end of stage 2.2, where we can conclude the following:
\begin{lemma}[End of Stage 2.2]\label{lem-end-s22-cyc}
  Given $s\in\tau(\X)$, \Cref{induction-s22-cyc} holds for all iterations $T_{2,1,s}\leq t<T_{2,2,s}=O\Big(\frac{d}{\eta \log d}\Big)$, then at the end of stage 2.2, we have
 \begin{enumerate}[(a)]
    \item $[\Qb^{(t)}_{4,p}]_{s,s}=\Omega(1)$ for $p\in\{3,4\}$;
    \item $[\Qb^{(t)}_{4,3}]_{s,s}-[\Qb^{(t)}_{4,4}]_{s,s}\in \Big[ \Omega\big(\frac{1}{\log d}\big), O(1)\Big]$;
    \item $|[\Qb^{(t)}_{4,p}]_{s,s'}|\leq O\Big(\frac{[\Qb^{(t)}_{4,p}]_{s,s}}{d}\Big)$ for $s'\in\tau(\X)\not=s$, and other $[\Qb_{4,p}]_{s,s'}=0$.
 \end{enumerate}
    
\end{lemma}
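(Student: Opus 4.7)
The plan is to close the induction in \Cref{induction-s22-cyc} on the entire interval $[T_{2,1,s}, T_{2,2,s})$ using the gradient estimates in \Cref{lem-s22-gd1-cyc}--\Cref{lem-s22-gd5-cyc}, and then read off the endpoint claims by specializing to $t = T_{2,2,s}$. Initial conditions are supplied by \Cref{lem-end-s21-cyc}, which verifies \Cref{induction-s22-cyc} at $t = T_{2,1,s}$ (positive diagonals of order $1/\log d$, gap of order $1/\log d$, and off-diagonals at most $O(1/d)$ times the diagonals).

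First I would invoke \Cref{lem-s22-gd1-cyc} and \Cref{lem-s22-gd2-cyc}: whenever \Cref{induction-s22-cyc} holds at step $t$, both $[-\nabla_{\Qb_{4,3}}\Loss]_{s,s}$ and $[-\nabla_{\Qb_{4,4}}\Loss]_{s,s}$ are $\Theta(\log d / d)$ and strictly positive. This gives monotone growth of both diagonals and, together with the stopping rule $[\Qb^{(t)}_{4,3}]_{s,s} = 0.0001$, propagates the upper bound $O(1)$ in \Cref{induction-s22-cyc}(a). Integrating a $\Theta(\eta \log d / d)$ per-step increment from $\Omega(1/\log d)$ up to the stopping threshold yields the stage length $T_{2,2,s} - T_{2,1,s} = O\!\big(d / (\eta \log d)\big)$ claimed in the lemma.

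Next, for the gap control in \Cref{induction-s22-cyc}(b): the lower bound $\Omega(1/\log d)$ is exactly the content of \Cref{lem-s22-gd4-cyc}, whose feedback mechanism prevents the gap from collapsing; the upper bound $O(1)$ follows because the per-step discrepancy between the two diagonal gradients is at most $O(\log d / d)$ (from the $\Theta$-estimates above), so the cumulative drift over $O(d / (\eta \log d))$ iterations is at most $\eta \cdot O(\log d / d) \cdot O(d / (\eta \log d)) = O(1)$, starting from an initial gap of size $O(1/\log d)$. For \Cref{induction-s22-cyc}(c), \Cref{lem-s22-gd5-cyc} bounds $[-\nabla_{\Qb_{4,p}}\Loss]_{s,s'}$ by $O(1/d)$ times the diagonal gradient; since the same ratio holds at $t = T_{2,1,s}$ by \Cref{lem-end-s21-cyc}, a one-line induction propagates it through the trajectory. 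At the endpoint, claim (a) uses $[\Qb_{4,3}]_{s,s} = 0.0001 = \Omega(1)$ together with the gap upper bound to force $[\Qb_{4,4}]_{s,s} = \Omega(1)$; claims (b) and (c) are the gap and off-diagonal bounds specialized to $t = T_{2,2,s}$.

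The main obstacle is a self-consistency issue: the gradient estimates in \Cref{lem-s22-gd1-cyc}--\Cref{lem-s22-gd5-cyc} are all conditioned on \Cref{induction-s22-cyc}, and in particular on the gap lying in $[\Omega(1/\log d), O(1)]$ (this assumption drives the attention and logit estimates in \Cref{lem-s22-attn-cyc}--\Cref{lem-s22-logit-cyc}, e.g., it determines the constants $c_1,\ldots,c_6$). One therefore has to close the induction in the right order---monotonicity first, gap next, off-diagonals last---and verify that the feedback bound in \Cref{lem-s22-gd4-cyc} is quantitatively strong enough to keep the gap inside its admissible window at every single iteration of stage 2.2, rather than only in the aggregate. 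Once this is set up carefully, the remainder of the argument is a routine composition of the already-established lemmas.
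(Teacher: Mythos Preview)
Your approach is correct and matches the paper's own proof, which simply states ``Putting gradient lemmas together, we can directly prove that \Cref{induction-s22-cyc} holds'' without further detail; you have correctly identified which gradient lemma drives each clause of the induction and how the stage length falls out of the $\Theta(\log d/d)$ growth rate.

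One small imprecision: for endpoint claim (a) with $p=4$, appealing to ``the gap upper bound'' alone is not quite enough, since that bound is only $O(1)$ with an unspecified constant, whereas $[\Qb_{4,3}]_{s,s}=0.0001$ at the stopping time. The cleaner argument is the one you already implicitly used for the gap upper bound: by \Cref{lem-s22-gd1-cyc,lem-s22-gd2-cyc} both diagonal gradients are $\Theta(\log d/d)$ and hence within constant factors of each other, so over the interval in which $[\Qb_{4,3}]_{s,s}$ increases by $\Omega(1)$, so does $[\Qb_{4,4}]_{s,s}$.
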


\subsection{Stage 2.3: Decrease of Gap and Convergence}
After rapid growth of diagonal entries in stage 2.2, we now focus on the convergence of the attention and logit matrices, and the decrease of the gap between $[\Qb^{(t)}_{4,3}]_{s,s}$ and $[\Qb^{(t)}_{4,4}]_{s,s}$. Recall that 
\begin{align*}
        \epsilon^{L,\ell}_{\mathsf{attn}}\big(\Zb^{L,\ell-1}\big) &= 1 - \attn_{\ans,\ell-1 \to \pred,\ell}\big(\Zb^{L,\ell-1}\big) - \attn_{\ans,\ell-1 \to \ans,\ell-1}\big(\Zb^{L,\ell-1}\big),\\
    \Delta^{L,\ell}\big(\Zb^{L,\ell-1}\big)& =  \attn_{\ans,\ell-1 \to \pred,\ell}\big(\Zb^{L,\ell-1}\big) - \attn_{\ans,\ell-1 \to \ans,\ell-1}\big(\Zb^{L,\ell-1}\big). 
\end{align*}
Throught stage 2.3, we will focus on the attention gap $ \Delta^{L,\ell}\big(\Zb^{L,\ell-1}\big)$ instead of the gap of the attention matrices. We abbreviate. $\epsilon^{L,\ell}_{\mathsf{attn}}\big(\Zb^{L,\ell-1}\big)$ and $ \Delta^{L,\ell}\big(\Zb^{L,\ell-1}\big)$ as $\epsilon^{L,\ell}_{\mathsf{attn}}$ and $\Delta^{L, \ell}$ for simplicity.
\begin{induction}\label{induction-s23-cyc}
   Given $\epsilon\geq \tilde{\Omega}(\sigma_0)$,  for $s\in\tau(\X)$,  let $T_{2,3,s}$ denote the first time that  $\E\big[ \epsilon^{2,2}_{\mathsf{attn}}\mid \tau(x_1)=s\big]\leq \epsilon$. 
       For all iterations $T_{2,2,s}\leq t < T_{2,3,s}$, we have the following holds:
       \begin{enumerate}[(a)]
          \item $[\Qb^{(t)}_{4,3}]_{s,s}$ and $[\Qb^{(t)}_{4,4}]_{s,s}$ monotonically increases $\leq \tilde{O}(1)$;
          \item ${\Delta^{2,\ell}}\geq 0$ for any $\Zb^{2,\ell}$ with $\ell\in\{1,2\}$;
          \item  $\attn^{(t)}_{\ans,1\to\pred,2}\leq 0.5+\tilde{c}_1$ for some small constant $\tilde{c}_1>0$;
          \item for $(p,q)\in\{(4,3), (4,4)\}$, $\Big|[\Qb^{(t)}_{p,q}]_{s,s'}\Big|\leq O\Big(\frac{[\Qb^{(t)}_{p,q}]_{s,s}}{d}\Big)$ for $s'\in\tau(\X)\not=s$; other $[\Qb^{(t)}_{p,q}]_{s,s'}=0$.
       \end{enumerate}
    \end{induction}
    \subsubsection{Attention and Logit Preliminaries}
  \begin{lemma}\label{lem-s23-attn-cyc}
      If \Cref{induction-s23-cyc} holds for all iterations $\in [T_{2,2,s}, t)$, given $\Zb^{2,\ell-1}$ then we have 
  \begin{enumerate}
      \item for $\ell=1$,  
      \begin{enumerate}[(a)]
      \item $ \attn^{(t)}_{\ans,0\to \ans, 0}\geq \Omega(1)$, and  $ \attn^{(t)}_{\ans,0\to \ans, 0} >\attn^{(t)}_{\ans,0\to \pred, 2}$;
      \end{enumerate}
        \item for $\ell=2$, 
        \begin{enumerate}
          \item $\attn^{(t)}_{\ans,1\to \ans, 1}\geq \Omega(1)$, $\attn^{(t)}_{\ans,1\to \pred, 1},\attn^{(t)}_{\ans,1\to \ans, 0}\leq \attn^{(t)}_{\ans,1\to \ans, 1}$;
          \item $\Big|\attn^{(t)}_{\ans,1\to \ans, 0}- \attn^{(t)}_{\ans,1\to \pred, 1}\Big|\leq \tilde{O}\big(\frac{1}{d}\big)$. 
          \end{enumerate}
  \end{enumerate}
   Moreover, given $\Z^{2,1}$ and corresponding $\Z^{2,0}$,  $ \attn^{(t)}_{\ans,0\to \pred, 1}\big(\Z^{2,0}\big)\geq \attn^{(t)}_{\ans,1\to \pred, 2}\big(\Z^{2,1}\big)$.
  \end{lemma}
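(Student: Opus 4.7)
}
My plan is to reduce every attention claim to a softmax calculation involving only a handful of diagonal and off-diagonal entries of $\Qb_{4,3}$ and $\Qb_{4,4}$, using the structural hypotheses already assembled in \Cref{induction-s23-cyc}. By the block-sparsity assumption \Cref{assump-Q-structure} and the identity \eqref{eq:Q-role}, given $\Zb^{2,\ell-1}$ the score from the query $\Zb_{\ans,\ell-1}$ to any $\Zb_{\pred,\ell'}$ equals $[\Qb_{4,3}]_{\tau(x_{\ell-1}),\tau(x_{\ell'-1})}$ and to any $\Zb_{\ans,\ell'-1}$ equals $[\Qb_{4,4}]_{\tau(x_{\ell-1}),\tau(x_{\ell'-1})}$. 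Thus in Phase 2.3 the on-diagonal scores are both $\Omega(1)$ and $\tilde O(1)$ (by \Cref{induction-s23-cyc}(a)), while every off-diagonal score is $\tilde O(1/d)$ (by \Cref{induction-s23-cyc}(d)), which makes the softmax essentially supported on the two diagonal keys.

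For item 1 ($\ell=1$), the three keys contribute scores $[\Qb_{4,3}]_{\tau(x_0),\tau(x_0)}$, $[\Qb_{4,3}]_{\tau(x_0),\tau(x_1)}$, $[\Qb_{4,4}]_{\tau(x_0),\tau(x_0)}$. Since the first and third are $\Omega(1)$ and bounded by $\tilde O(1)$, while the second is $\tilde O(1/d)$, a direct softmax computation yields $\attn^{(t)}_{\ans,0\to\ans,0}\ge \Omega(1)$. For the comparison $\attn^{(t)}_{\ans,0\to\ans,0}>\attn^{(t)}_{\ans,0\to\pred,2}$, it suffices to note that the corresponding exponentials differ by $\exp\!\big([\Qb_{4,4}]_{\tau(x_0),\tau(x_0)} - [\Qb_{4,3}]_{\tau(x_0),\tau(x_1)}\big)$, and the exponent is $\Omega(1)-\tilde O(1/d)=\Omega(1)>0$. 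For item 2 ($\ell=2$), the four keys split into two diagonal scores (for $\pred,2$ and $\ans,1$) and two off-diagonal scores (for $\pred,1$ and $\ans,0$); the same argument gives $\attn^{(t)}_{\ans,1\to\ans,1}\ge \Omega(1)$ and the monotonicity claim. For item 2(b), both $\attn^{(t)}_{\ans,1\to\ans,0}$ and $\attn^{(t)}_{\ans,1\to\pred,1}$ share the same softmax denominator and have numerators $e^{[\Qb_{4,4}]_{\tau(x_1),\tau(x_0)}}$ and $e^{[\Qb_{4,3}]_{\tau(x_1),\tau(x_0)}}$; since both exponents are $\tilde O(1/d)$, a first-order Taylor expansion together with the Lipschitzness of $e^{(\cdot)}$ on this scale gives $|\attn^{(t)}_{\ans,1\to\ans,0}-\attn^{(t)}_{\ans,1\to\pred,1}|\le \tilde O(1/d)$.

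For the final cross-length comparison $\attn^{(t)}_{\ans,0\to\pred,1}(\Zb^{2,0})\ge \attn^{(t)}_{\ans,1\to\pred,2}(\Zb^{2,1})$, the idea is to note that the two numerators $e^{[\Qb_{4,3}]_{\tau(x_0),\tau(x_0)}}$ and $e^{[\Qb_{4,3}]_{\tau(x_1),\tau(x_1)}}$ are, by the symmetry of the training dynamics over $s\in\tau(\X)$ (see \Cref{lem-s21-gd5,lem-s22-gd5-cyc}, whose off-diagonal-vs-diagonal coupling is isotropic in $s$), equal up to multiplicative $1\pm \tilde O(1/d)$. The denominator for $\ell=2$ is strictly larger than that for $\ell=1$: it contains one additional diagonal key (the extra answer clause $\Zb_{\ans,1}$ contributing $e^{\Omega(1)}$) and one additional off-diagonal key. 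Dividing the nearly-equal numerators by these denominators, the ratio is at most $1+\tilde O(1/d)-e^{\Omega(1)}/(\text{normalizer})<1$, which gives the stated inequality.

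The main obstacle I anticipate is the comparison claim involving diagonals at different indices $s=\tau(x_0)$ versus $s=\tau(x_1)$, since technically the induction hypothesis only fixes bounds for each $s$ individually rather than a common value across $s$. Handling this cleanly requires invoking the symmetry of the training algorithm over $s\in\tau(\X)$ to conclude that any two diagonal entries of $\Qb_{4,3}$ (respectively $\Qb_{4,4}$) agree up to an $\tilde{O}(1/d)$ additive term at every iteration in this phase; this in turn will follow from \Cref{lem-s22-gd5-cyc} (the off-diagonal updates are uniformly $\tilde O(1/d)$ of the diagonal updates) and an inductive propagation of this symmetric-initialization property through the gradient decomposition $\cN_{s,p,\ell,\cdot}$. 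Everything else reduces to elementary softmax estimates whose error terms are easily controlled by the size gap between the $\Omega(1)$ diagonal entries and the $\tilde O(1/d)$ off-diagonal entries.
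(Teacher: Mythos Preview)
Your approach is correct and is exactly what the paper (implicitly) does: the paper omits this proof entirely, treating it as analogous to the Stage~2.1 computation in \Cref{lem-s21-attn-cyc}, which is precisely your softmax reduction using the diagonal/off-diagonal size gap from \Cref{induction-s23-cyc}.

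Two small slips worth cleaning up. First, for the $\Omega(1)$ lower bounds on $\attn^{(t)}_{\ans,0\to\ans,0}$ and $\attn^{(t)}_{\ans,1\to\ans,1}$, the monotonicity in \Cref{induction-s23-cyc}(a) alone is not enough, since in principle $[\Qb_{4,3}]_{s,s}$ could run off to $\tilde O(1)$ while $[\Qb_{4,4}]_{s,s}$ stays $\Theta(1)$; you need to combine it with \Cref{induction-s23-cyc}(c) (the cap $\attn_{\ans,1\to\pred,2}\le 0.5+\tilde c_1$), which forces $e^{[\Qb_{4,3}]_{s,s}}\le O(1)\cdot e^{[\Qb_{4,4}]_{s,s}}$. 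Second, in the cross-length comparison your key count is off: $\ell=1$ has two diagonal and one off-diagonal score, while $\ell=2$ has two diagonal and \emph{two} off-diagonal scores, so the extra mass in the $\ell=2$ denominator is one additional off-diagonal term $e^{\tilde O(1/d)}\approx 1$, not a diagonal $e^{\Omega(1)}$. Your conclusion is unaffected since the numerators coincide exactly (the training dynamics are literally identical across $s\in\tau(\cX)$, not just close), and one extra positive term in the denominator already gives the strict inequality.
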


\begin{lemma}\label{lem-s23-act-cyc}
    If \Cref{induction-s23-cyc} holds for all iterations $\in [T_{2,2,s}, t)$, given input $\Zb^{2,\ell-1}$, then we have
    \begin{enumerate}
        \item for $\ell=1$, if $\Zb^{2,\ell-1}\in \cE_1$, then
  \begin{enumerate}[(a)]
            \item for $j=j_1$, 
       $\Lambda^{(t)}_{5,j_1,r}\ll -\varrho$  for  $r\in \hat{\fA}_{j_1}\setminus\{r_{g_1\cdot y_0}\}$; 
            \item for $j=j'_1\triangleq \tau\big(g_2(y_0)\big)=\tau(g_1(\tilde{y}))$, 
            $\Lambda^{(t)}_{5,j'_1,r}\ll -\varrho$ for  $r\in \hat{\fA}_{j'_1}\setminus\{r_{g_2\cdot y_0}, r_{g_1\cdot \tilde{y}}\}$;
            \item for other $j\in\tau(\Y)$, assuming  $j=\tau(g_1(y))$ for some $y\neq y_0$, then 
            $\Lambda^{(t)}_{5,j,r}\ll -\varrho$ for  $r\in \hat{\fA}_{j}\setminus\{r_{g_1\cdot y}\}$.
        \end{enumerate}

       \item $\ell=2$, if  $\Zb^{2,\ell-1}\in \tilde{\cE}_2$, then
                \begin{enumerate}[(a)]
                    \item for $j=j_2$,  $\Lambda^{(t)}_{5,j_2,r}\ll -\varrho$  for  $r\in \hat{\fA}_{j_2}\setminus\{r_{g_2\cdot y_1}\}$;
  \item for $j=j'_2\triangleq \tau\big(g_2(y_0)\big)$, 
            $\Lambda^{(t)}_{5,j'_2,r}\ll -\varrho$ for  $r\in \hat{\fA}_{j'_2}\setminus\{r_{g_2\cdot y_0}\}$;
            \item for other $j\in\tau(\Y)$, if $j=\tau(g_2(y))$ for some $y\in \cY$, then 
$\Lambda^{(t)}_{5,j,r}\ll -\varrho$ for  $r\in \hat{\fA}_{j}\setminus\{r_{g_2\cdot y}\}$.
        \end{enumerate}
    \end{enumerate}
        
\end{lemma}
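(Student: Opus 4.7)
The plan is to combine three ingredients: (i) the $\Lambda$-decomposition from \Cref{lem-lambda-char}, which expresses $\Lambda_{5,j,r}^{(t)}$ as a convex combination of $V_{j,r}(\cdot)$ terms with attention weights; (ii) the learned feature-sign structure at the end of Stage~1 from \Cref{lem-prop-psi-cyc}, which controls the magnitudes of $V_{j,r}(g)$ and $V_{j,r}(y)$ based on whether $(g,y)$ matches the combination encoded by $r$; and (iii) the attention bounds from \Cref{lem-s23-attn-cyc}, which certify that the diagonal blocks $[\Qb_{4,3}]_{s,s},[\Qb_{4,4}]_{s,s}$ force attention to concentrate on the ``target'' clauses $(\pred,\ell)$ and $(\ans,\ell-1)$. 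The goal is to verify that for every $r\in \hat{\fA}_j$ not explicitly listed as a target, the $\Lambda$ value is driven well below $-\varrho$ by the preponderance of negative $V$-coefficients weighted against a non-vanishing attention mass.

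For the $\ell=1$ statements, I will apply \Cref{lem-lambda-char}(a) to expand
\begin{align*}
\Lambda_{5,j,r}^{(t)}\big(\Zb^{2,0}\big)
&=\attn_{\ans,0\to\pred,1}V_{j,r}(g_1)+\attn_{\ans,0\to\pred,2}V_{j,r}(g_2)\\
&\quad+\attn_{\ans,0\to\ans,0}V_{j,r}(y_0)\pm\tilde{O}(\sigma_0).
\end{align*}
For $j=j_1$ and $r\in\hat{\fA}_{j_1}\setminus\{r_{g_1\cdot y_0}\}$, the only possible candidate under \Cref{assump:structure-1} is $r_{g_2\cdot g_2^{-1}(g_1(y_0))}$ with $g_2\neq g_1$; here \Cref{lem-prop-psi-cyc} gives $V(g_1),V(y_0)\le -B+O(\delta)$ while $V(g_2)\ge B-O(\delta)$. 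Invoking \Cref{lem-s23-attn-cyc} (which ensures $\attn_{\ans,0\to\ans,0}>\attn_{\ans,0\to\pred,2}$ and both $\attn_{\ans,0\to\pred,1},\attn_{\ans,0\to\ans,0}=\Omega(1)$, while $\attn_{\ans,0\to\pred,2}$ is the off-diagonal which is $\tilde{O}(1/d)$ by \Cref{induction-s23-cyc}(d)) yields $\Lambda\le -\Omega(B)\ll -\varrho$. For $j=j'_1$, the set $\hat{\fA}_{j'_1}$ consists of exactly the two listed neurons by the simply-transitive structure, so the claim is vacuous. For the remaining $j=\tau(g_1(y))$ with $y\neq y_0$, the set $\hat{\fA}_j$ contains at most $r_{g_1\cdot y}$, so the non-target case is again vacuous or handled by the same sign argument.

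The $\ell=2$ case is structurally similar but involves the full four-term expansion from \Cref{lem-lambda-char} over $\kk\in\{(\pred,1),(\pred,2),(\ans,0),(\ans,1)\}$. I will use \Cref{lem-s23-attn-cyc} to assert $\attn_{\ans,1\to\ans,1},\attn_{\ans,1\to\pred,2}=\Omega(1)$ and that $\attn_{\ans,1\to\pred,1}\approx\attn_{\ans,1\to\ans,0}$ up to $\tilde{O}(1/d)$. For $j=j_2$ and a non-target $r=r_{g_2\cdot y'}$ with $y'\neq y_1$, the coefficients $V(g_1),V(y_0),V(y_1)$ are all at most $-B+O(\delta)$ while $V(g_2)\approx B$; the induction gives $\attn_{\ans,1\to\pred,2}\le 0.5+\tilde c_1$, so the positive contribution $\attn_{\ans,1\to\pred,2}V(g_2)$ cannot overwhelm the remaining $\Omega(1)$ mass on the three negative coordinates, yielding $\Lambda\le -\Omega(B)\ll -\varrho$. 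The argument for $j=j'_2$ and other $j=\tau(g_2(y))$ is analogous.

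The main obstacle is the class of ``mixed-sign'' neurons whose indexing combination shares exactly one component with an occurring $(g_{\ell'},y_{\ell''})$, so that a naive sum gives $+B-B$ instead of $-B-B$. The resolution is the cancellation identity from \eqref{attn-init-prop-cyc-2},\eqref{attn-init-prop-cyc-3}, which guarantees $V_{j,r}(g)+V_{j,r}(y')\le O(\delta)$ and $V_{j,r}(g')+V_{j,r}(y)\le O(\delta)$. Combined with the near-balance $|\attn_{\ans,1\to\ans,0}-\attn_{\ans,1\to\pred,1}|\le \tilde O(1/d)$ from \Cref{lem-s23-attn-cyc}, this allows grouping the ambiguous terms into a controlled $O(\delta)$ remainder, so that the remaining definitely-negative terms dominate. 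The delicate point is verifying that the constant-factor bound $\attn_{\ans,1\to\pred,2}\le 0.5+\tilde c_1$ in \Cref{induction-s23-cyc}(c) is tight enough: specifically, $\tilde c_1$ must be chosen so that even in the worst split of attention between target and non-target clauses, the negative contributions scale as $-\Omega(B)$ rather than merely $-\Omega(1)$, which will propagate into the downstream convergence analysis of Stage~2.3.
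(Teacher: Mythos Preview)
Your overall strategy---combining the $\Lambda$-decomposition from \Cref{lem-lambda-char}, the Stage-1 feature signs from \Cref{lem-prop-psi-cyc}, and the Stage-2.3 attention structure---is correct and is how the paper (implicitly) argues. However, you mis-enumerate $\hat{\fA}_j$ in two places, and one of the omitted cases is not handled by a routine sign check.

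For $\ell=1$ part~(c), your claim that $\hat{\fA}_j$ contains at most $r_{g_1\cdot y}$ is false. Since $\hat{\cG}=\{g_1,g_2\}$ and $\hat{\cY}=\{y_0\}$, for $j\notin\{j_1,j_1'\}$ the set $\hat{\fA}_j$ also contains $r_{g_2\cdot y''}$ (where $g_2(y'')=\tau^{-1}(j)$) and $r_{g'\cdot y_0}$ (where $g'(y_0)=\tau^{-1}(j)$, $g'\neq g_1,g_2$). The first of these has sign pattern $(-,+,-)$ and is handled by your argument. The second has pattern $(-,-,+)$ on $(g_1,g_2,y_0)$, giving
\[
\Lambda_{5,j,r_{g'\cdot y_0}}\approx (2\attn_{\ans,0\to\ans,0}-1)B
= -\bigl(\Delta^{2,1}+\attn_{\ans,0\to\pred,2}\bigr)B.
\]
Induction~\ref{induction-s23-cyc}(b) only gives $\Delta^{2,1}\geq 0$, not a strict positive margin; so near the end of Stage~2.3, when the gap is driven towards $o(1)$, this $\Lambda$ need not be $\ll -\varrho$. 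You must either import a lower bound on the gap from Stage~2.2 (where it is $\Omega(1/\log d)$) or argue separately that the smoothed-regime contribution of this neuron is negligible downstream.

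For $\ell=2$ part~(a), you describe the non-target neuron as $r_{g_2\cdot y'}$ with $y'\neq y_1$; no such neuron exists in $\hat{\fA}_{j_2}$, since $g_2(y')=\tau^{-1}(j_2)=g_2(y_1)$ forces $y'=y_1$. The actual non-targets are $r_{g_1\cdot y'}$ (where $g_1(y')=\tau^{-1}(j_2)$) and $r_{g''\cdot y_0}$ (where $g''(y_0)=\tau^{-1}(j_2)$). These do satisfy $\Lambda\ll -\varrho$, but the positive coefficient sits on a different token (e.g., $V(g_1)$ weighted by the tiny off-diagonal $\attn_{\ans,1\to\pred,1}$), so your sign bookkeeping needs to be redone for the correct neurons.
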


\begin{lemma}\label{lem-s23-lambda-cyc}
    If \Cref{induction-s23-cyc} holds for all iterations $\in [T_{2,2,s}, t)$, given input $\Zb^{2,\ell-1}$, then we have
    \begin{enumerate}
        \item $\ell=1$, for $\Zb^{2,\ell-1}\in\cE_1$, 
  \begin{enumerate}[(a)]
            \item $\Lambda^{(t)}_{5,j_1,r_{g_1\cdot y_0}}=\big(1-2\ate^{2,1}\big)B \pm O(\delta) \geq \Big(\frac{1}{3}+c_1\Big)B$;
            \item  $\Lambda^{(t)}_{5,j_1,r_{g_1\cdot y_0}}-\Lambda^{(t)}_{5,j'_1,r_{g_2\cdot y_0}}=2\big(\attn^{(t)}_{\ans,0\to\pred,1}-\attn^{(t)}_{\ans,0\to\pred,2}\big)B\pm O(\delta)\geq 2(c_1+c_2)B$;
            \item for $y\neq y_0$, $\Lambda^{(t)}_{5,\tau(g_1(y)),r_{g_1\cdot y}}= \Big(2\attn^{(t)}_{\ans,0\to\pred,1}-1\Big)B\pm O(\delta)$, which is only activated if $\attn^{(t)}_{\ans,0\to\pred,1}>\frac{1}{2}$
        \end{enumerate}
        \item $\ell=2$, for $\Zb^{2,\ell-1}\in\tilde{\cE}_2$, 
   \begin{enumerate}[(a)]
            \item $\Lambda^{(t)}_{5,j_2,r_{g_2\cdot y_1}}=\big(1-2\ate^{2,2}\big)B \pm O(\delta) \geq 4c_5B $; 
            \item$\Lambda^{(t)}_{5,j'_2,r_{g_2\cdot y_0}}= \Delta^{2,2}\cdot B
            \pm O(\delta)$, and $$\Lambda^{(t)}_{5,j_2,r_{g_2\cdot y_1}}-\Lambda^{(t)}_{5,j'_2,r_{g_2\cdot y_0}}=2\big(\attn^{(t)}_{\ans,1\to\ans,1}-\attn^{(t)}_{\ans,1\to\ans,0}\big)B\pm O(\delta);$$
            \item for $y\neq y_0, y_1$, $\Lambda^{(t)}_{5,\tau(g_2(y)),r_{g_2\cdot y}}= \Big(2\attn^{(t)}_{\ans,1\to\pred,2}-1\Big)B\pm O(\delta)$, which is only activated if $\attn^{(t)}_{\ans,1\to\pred,2}>\frac{1}{2}$.

        \end{enumerate}
    \end{enumerate}
  
\end{lemma}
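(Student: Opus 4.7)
}
The plan is a direct computation that expands each $\Lambda^{(t)}_{5,j,r}$ via \Cref{lem-lambda-char}(a), then collapses the four (or three, for $\ell=1$) attention-weighted $V$-terms using the cancellation structure of \Cref{lem-prop-psi-cyc} and the attention-weight identities implied by the definitions of $\ate^{2,\ell}$ and $\Delta^{2,\ell}$. Throughout, I will work under $\Zb^{2,0}\in\cE_1$ (resp.\ $\Zb^{2,1}\in\tilde\cE_2$) so that \Cref{lem-s23-act-cyc} identifies the unique activated neuron per relevant $j$, and use \Cref{lem-s23-attn-cyc} to resolve the remaining attention comparisons. I will carry an $O(\delta)$ error throughout, which absorbs both the cancellation slack from \Cref{lem-prop-psi-cyc} and the $\tilde O(\sigma_0)$ residual from \Cref{lem-lambda-char}.

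For part (1) ($\ell=1$, case $\Zb^{2,0}\in\cE_1$), \Cref{lem-lambda-char}(a) gives
\[
\Lambda^{(t)}_{5,j,r}
=\attn^{(t)}_{\ans,0\to\pred,1}V_{j,r}(g_1)+\attn^{(t)}_{\ans,0\to\pred,2}V_{j,r}(g_2)+\attn^{(t)}_{\ans,0\to\ans,0}V_{j,r}(y_0)\pm\tilde O(\sigma_0).
\]
For (a), take $j=j_1,\ r=r_{g_1\cdot y_0}$: by \Cref{lem-prop-psi-cyc}, $V_{j_1,r_{g_1\cdot y_0}}(g_1),V_{j_1,r_{g_1\cdot y_0}}(y_0)=B\pm O(\delta)$ and $V_{j_1,r_{g_1\cdot y_0}}(g_2)=-B\pm O(\delta)$ (via the cancellation \eqref{attn-init-prop-cyc-3} with $g'=g_2$). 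Substituting and using $\attn^{(t)}_{\ans,0\to\pred,1}+\attn^{(t)}_{\ans,0\to\ans,0}=1-\ate^{2,1}$ and $\attn^{(t)}_{\ans,0\to\pred,2}=\ate^{2,1}$ yields $(1-2\ate^{2,1})B\pm O(\delta)$; the lower bound $\ge (\tfrac13+c_1)B$ then follows from the upper bound on $\ate^{2,1}$ implied by \Cref{induction-s23-cyc}(c) and \Cref{lem-s23-attn-cyc}. For (b), do the same for $(j'_1,r_{g_2\cdot y_0})$ (now $V(g_2),V(y_0)\approx B$ and $V(g_1)\approx -B$) and subtract; cross-terms involving $\attn^{(t)}_{\ans,0\to\ans,0}$ cancel, leaving $2(\attn^{(t)}_{\ans,0\to\pred,1}-\attn^{(t)}_{\ans,0\to\pred,2})B\pm O(\delta)$, which is $\ge 2(c_1+c_2)B$ by the attention bounds carried forward from Stage 2.2. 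For (c), when $y\ne y_0$ and $j=\tau(g_1(y))$, $r_{g_1\cdot y}$ has $V(g_1),V(y)\approx B$ and $V(g_2),V(y_0)\approx -B$; since $y$ does not appear as an answer token in $\Zb^{2,0}$, only $\attn^{(t)}_{\ans,0\to\pred,\cdot}$ and $\attn^{(t)}_{\ans,0\to\ans,0}V(y_0)$ contribute, and using $\attn^{(t)}_{\ans,0\to\pred,2}+\attn^{(t)}_{\ans,0\to\ans,0}=1-\attn^{(t)}_{\ans,0\to\pred,1}$ collapses the expression to $(2\attn^{(t)}_{\ans,0\to\pred,1}-1)B\pm O(\delta)$.

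For part (2) ($\ell=2$, case $\Zb^{2,1}\in\tilde\cE_2$), the analogous expansion from \Cref{lem-lambda-char}(a) involves four attention weights summing to $1$. For (a), at $(j_2,r_{g_2\cdot y_1})$, \Cref{lem-prop-psi-cyc} gives $V(g_2),V(y_1)\approx B$ and $V(g_1),V(y_0)\approx -B$; collecting signs and using $\attn^{(t)}_{\ans,1\to\pred,2}+\attn^{(t)}_{\ans,1\to\ans,1}=1-\ate^{2,2}$ and $\attn^{(t)}_{\ans,1\to\pred,1}+\attn^{(t)}_{\ans,1\to\ans,0}=\ate^{2,2}$ produces $(1-2\ate^{2,2})B\pm O(\delta)$; the bound $\ge 4c_5B$ comes from \Cref{lem-s23-attn-cyc}. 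For (b), the key cancellation at $(j'_2,r_{g_2\cdot y_0})$ (with $V(g_2),V(y_0)\approx B$ and $V(g_1),V(y_1)\approx -B$) yields $(\attn^{(t)}_{\ans,1\to\pred,2}-\attn^{(t)}_{\ans,1\to\pred,1}+\attn^{(t)}_{\ans,1\to\ans,0}-\attn^{(t)}_{\ans,1\to\ans,1})B\pm O(\delta)$; here I will invoke \Cref{lem-s23-attn-cyc}(b) item~(ii), which gives $|\attn^{(t)}_{\ans,1\to\ans,0}-\attn^{(t)}_{\ans,1\to\pred,1}|\le\tilde O(1/d)$, so the middle two terms cancel up to $\tilde O(B/d)\subset O(\delta)$, leaving $(\attn^{(t)}_{\ans,1\to\pred,2}-\attn^{(t)}_{\ans,1\to\ans,1})B\pm O(\delta)=\Delta^{2,2}\cdot B\pm O(\delta)$; the difference with (a) then equals $2(\attn^{(t)}_{\ans,1\to\ans,1}-\attn^{(t)}_{\ans,1\to\ans,0})B\pm O(\delta)$ after applying the same near-equality. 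Case (c) is the exact analogue of (1c): at $(\tau(g_2(y)),r_{g_2\cdot y})$ for $y\notin\{y_0,y_1\}$, only $V(g_2),V(y)\approx B$ are positive and everything else is $\approx -B$, collapsing via the simplex constraint to $(2\attn^{(t)}_{\ans,1\to\pred,2}-1)B\pm O(\delta)$.

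The only substantive point beyond mechanical bookkeeping is making sure the near-equality $\attn^{(t)}_{\ans,1\to\ans,0}\approx\attn^{(t)}_{\ans,1\to\pred,1}$ is strong enough ($\tilde O(1/d)$) to be absorbed into $O(\delta)$ in part (2b); this is the main obstacle, because the claim that $\Lambda^{(t)}_{5,j'_2,r_{g_2\cdot y_0}}=\Delta^{2,2}\cdot B\pm O(\delta)$ would otherwise acquire an extra error term of the order of the off-diagonal discrepancy between $\Qb_{4,3}$ and $\Qb_{4,4}$ on the $x_0$ coordinate. Fortunately this discrepancy is already controlled to $\tilde O(1/d)$ by \Cref{induction-s23-cyc}(d), so the bound goes through. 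All other steps are bounded substitutions, and I expect each part of the lemma to follow in at most two lines of algebra once the relevant $V$-values and attention identities are written down.
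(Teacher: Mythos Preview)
Your proposal is correct and follows exactly the approach the paper uses for the analogous earlier-stage lemmas (the paper proves \Cref{lem-s21-lambda-cyc} by the same direct expansion via \Cref{lem-lambda-char} and the cancellation structure of \Cref{lem-prop-psi-cyc}, and then omits proofs for the Stage~2.2 and~2.3 versions as routine variants). The one nontrivial step you flag---absorbing the $\tilde O(1/d)$ discrepancy between $\attn^{(t)}_{\ans,1\to\ans,0}$ and $\attn^{(t)}_{\ans,1\to\pred,1}$ into $O(\delta)$ in part~2(b)---is indeed the only place where the Stage~2.3 computation differs substantively from Stage~2.1, and your justification via \Cref{induction-s23-cyc}(d) is exactly right.
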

\begin{lemma}
\label{lem-s23-logit-cyc}
    If \Cref{induction-s23-cyc} holds for all iterations $\in [T_{2,2,s}, t)$, given input $\Zb^{2,\ell-1}$, then we have
\begin{enumerate}   
    \item for $\ell=1$, if $\Zb^{2,\ell-1}\in \cE_1$, 
 $$1-\logit^{(t)}_{5, j_1}= \Theta(1)\cdot \logit^{(t)}_{5,j_1^{\prime}}=\Theta\Bigg(\frac{1}{d^{2\big(\attn^{(t)}_{\ans,0\to\pred,1}-\attn^{(t)}_{\ans,0\to\pred,2}\big)C_B}
 }\Bigg).$$
    \item for $\ell=2$,  if $\Zb^{2,\ell-1}\in \tilde{\cE}_2$,
    $$
    \logit^{(t)}_{5,j'_2}=\Theta\Bigg(\frac{1}{d^{2\big(\attn^{(t)}_{\ans,1\to\ans,1}-\attn^{(t)}_{\ans,1\to\ans,0}\big)C_B}
 +d^{1-\Delta^{2,2}C_B}}\Bigg),
    $$
    moreover,
    $$ 1-\logit^{(t)}_{5,j_2}\geq\min\bigg\{\Omega(1), \Omega\Big(\frac{1}{d^{C_B\cdot(1-2\ate^{2,2})-
    1}}\Big)\bigg\}.$$
\end{enumerate}
\end{lemma}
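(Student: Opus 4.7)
The plan is to derive the logits by assembling the model's pre-softmax scores from the activation and magnitude characterizations in \Cref{lem-s23-act-cyc} and \Cref{lem-s23-lambda-cyc}, and then carefully tracking which off-target classes dominate the residual mass in the softmax.

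For part 1 ($\ell=1$, $\Zb^{2,0}\in\cE_1$), \Cref{lem-s23-act-cyc} ensures that for each $j\in\tau(\cY)$ at most one neuron in $\hat{\fA}_j$ survives the $\mathbf{sReLU}$ threshold, and \Cref{lem-s23-lambda-cyc} then allows me to read off
\begin{align*}
F^{(t)}_{5,j_1} &= (1-2\ate^{2,1})B\pm O(\delta), \\
F^{(t)}_{5,j'_1} &= F^{(t)}_{5,j_1}-2\bigl(\attn^{(t)}_{\ans,0\to\pred,1}-\attn^{(t)}_{\ans,0\to\pred,2}\bigr)B\pm O(\delta),
\end{align*}
together with $F^{(t)}_{5,\tau(g_1(y))}=[2\attn^{(t)}_{\ans,0\to\pred,1}-1]^+ B\pm O(\delta)$ for $y\neq y_0$, and $F^{(t)}_{5,j}=\tilde{O}(m\sigma_0^q)=o(1)$ whenever $j\notin\tau(\cY)$ by \Cref{lem-lambda-char}. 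Factoring $e^{F^{(t)}_{5,j_1}}$ out of the softmax denominator yields
\begin{align*}
\logit^{(t)}_{5,j_1}=\frac{1}{1+R}, \qquad R=e^{F^{(t)}_{5,j'_1}-F^{(t)}_{5,j_1}}+\sum_{y\neq y_0}e^{F^{(t)}_{5,\tau(g_1(y))}-F^{(t)}_{5,j_1}}+O(d)\,e^{-F^{(t)}_{5,j_1}}.
\end{align*}
The leading term is $e^{F^{(t)}_{5,j'_1}-F^{(t)}_{5,j_1}}=d^{-2(\attn^{(t)}_{\ans,0\to\pred,1}-\attn^{(t)}_{\ans,0\to\pred,2})C_B}$, and I would show the two remaining sums are strictly dominated. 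The $(n_y-1)$-many extra activated terms satisfy $F^{(t)}_{5,j_1}-F^{(t)}_{5,\tau(g_1(y))}=2\attn^{(t)}_{\ans,0\to\ans,0}B\pm O(\delta)$, which is at least as large as the $j'_1$-gap by the attention ordering $\attn^{(t)}_{\ans,0\to\ans,0}\geq\attn^{(t)}_{\ans,0\to\pred,2}$ from \Cref{lem-s23-attn-cyc}, and the $O(\log d)$ multiplicity is absorbed into constants. The polynomial background term equals $d^{\,1-(1-2\ate^{2,1})C_B}$ and is dominated because $C_B>5$ in \Cref{assumption:output-bound} ensures $C_B(1-2\ate^{2,1})-1>2C_B(\attn^{(t)}_{\ans,0\to\pred,1}-\attn^{(t)}_{\ans,0\to\pred,2})$ throughout Stage~2.3. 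This simultaneously yields $1-\logit^{(t)}_{5,j_1}=\Theta(R)$ and $\logit^{(t)}_{5,j'_1}=\Theta(R)$.

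Part 2 ($\ell=2$, $\Zb^{2,1}\in\tilde{\cE}_2$) follows the same template, but \Cref{lem-s23-lambda-cyc} now gives $F^{(t)}_{5,j'_2}=\Delta^{2,2}B\pm O(\delta)$, which can be small compared to $F^{(t)}_{5,j_2}=(1-2\ate^{2,2})B\pm O(\delta)$. I would use $|\attn^{(t)}_{\ans,1\to\pred,1}-\attn^{(t)}_{\ans,1\to\ans,0}|\leq \tilde{O}(1/d)$ from \Cref{lem-s23-attn-cyc} to simplify $F^{(t)}_{5,j_2}-F^{(t)}_{5,j'_2}$ to $2(\attn^{(t)}_{\ans,1\to\ans,1}-\attn^{(t)}_{\ans,1\to\ans,0})B\pm \tilde{O}(1/d)$ via the identity $\attn^{(t)}_{\ans,1\to\pred,2}+\attn^{(t)}_{\ans,1\to\ans,1}=1-\ate^{2,2}$, producing
\begin{align*}
\logit^{(t)}_{5,j'_2}=\Theta\Biggl(\frac{1}{d^{2(\attn^{(t)}_{\ans,1\to\ans,1}-\attn^{(t)}_{\ans,1\to\ans,0})C_B}+d^{\,1-\Delta^{2,2}C_B}}\Biggr),
\end{align*}
where the first denominator term comes from $j_2$ and the second from the aggregate $d$-sized background of $j\notin\tau(\cY)$. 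The lower bound on $1-\logit^{(t)}_{5,j_2}$ then follows by case analysis: it is at least $(d-n_y)/D\gtrsim d^{-(C_B(1-2\ate^{2,2})-1)}$ whenever the off-$\tau(\cY)$ contribution dominates, and drops to the trivial $\Omega(1)$ fallback exactly when $C_B(1-2\ate^{2,2})-1\leq 0$, matching the $\min\{\cdot,\cdot\}$ form in the statement.

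The main technical obstacle will be tightly bounding the auxiliary $F^{(t)}_{5,\tau(g_2(y))}=[2\attn^{(t)}_{\ans,1\to\pred,2}-1]^+ B$ classes for $y\notin\{y_0,y_1\}$: the induction constraint $\attn^{(t)}_{\ans,1\to\pred,2}\leq 0.5+\tilde{c}_1$ in \Cref{induction-s23-cyc} guarantees their $\mathbf{sReLU}$ output is at most $2\tilde{c}_1 B$, and the multiplicity $n_y=O(\log d)$ cannot undo the exponential suppression provided $\tilde{c}_1$ is small enough relative to $1/C_B$. Case-splitting on whether $\attn^{(t)}_{\ans,0\to\pred,1}$ or $\attn^{(t)}_{\ans,1\to\pred,2}$ has crossed $1/2$, and on whether $\Delta^{2,2}C_B$ exceeds $1$, while preserving the tight $\Theta$-asymptotics in the statement is where the bookkeeping is most delicate; the joint use of the attention orderings in \Cref{lem-s23-attn-cyc} and the smallness of $\tilde{c}_1$ inherited from Stage~2.2 is what will make each case tractable.
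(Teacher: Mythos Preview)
Your proposal is correct and follows essentially the same approach the paper implicitly relies on: the paper does not spell out a proof for this lemma but treats it as a direct computational consequence of the activation characterizations in \Cref{lem-s23-act-cyc} and \Cref{lem-s23-lambda-cyc}, which is exactly your plan. One small refinement: for the auxiliary classes $\tau(g_1(y))$ with $y\neq y_0$ in Part~1, the cleanest way to handle them is not via the inequality $\attn^{(t)}_{\ans,0\to\ans,0}\geq \attn^{(t)}_{\ans,0\to\pred,2}$ you cite, but rather to observe from \Cref{lem-s23-lambda-cyc}(1c) that $\Lambda^{(t)}_{5,\tau(g_1(y)),r_{g_1\cdot y}}=(2\attn^{(t)}_{\ans,0\to\pred,1}-1)B$ and $\Lambda^{(t)}_{5,j'_1,r_{g_2\cdot y_0}}=(1-2\attn^{(t)}_{\ans,0\to\pred,1})B$ are exact negatives, so at most one of the two neuron types is in the linear regime, and in either case $F^{(t)}_{5,j'_1}$ matches $\max_{y\neq y_0}F^{(t)}_{5,\tau(g_1(y))}$ up to $O(\delta)$ (since $j'_1=\tau(g_1(\tilde y))$ for some $\tilde y$ by \Cref{lem-s23-act-cyc}(1b)); this avoids the delicate boundary case at $\attn^{(t)}_{\ans,0\to\pred,1}=1/2$ you anticipated.
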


\begin{lemma}\label{lem-s23-act-other-cyc}
     If \Cref{induction-s23-cyc} holds for all iterations $\in [T_{2,2,s}, t)$, given input $\Zb^{2,\ell-1}$, then we have
         \begin{enumerate}
        \item for $\ell=1$, if $\Zb^{2,\ell-1}\notin \cE_1$, then
  \begin{enumerate}[(a)]
            \item for $j=j_1$, $\hat{\fA}_{j_1}=\{r_{g_1\cdot y_0}\}$, $\Lambda^{(t)}_{5,j_1,r_{g_1\cdot y_0}}= B\pm O(\delta)$;
            \item for $j\neq j_1\in\tau(\Y)$, assuming $j=\tau(g_1(y))$ for $y\neq y_0$, then $\Lambda^{(t)}_{5,j_1,r_{g_1\cdot y_0}}-\Lambda^{(t)}_{5,j,r_{g_1\cdot y}}\geq 2\attn^{(t)}_{\ans,0\to\ans,0}\cdot B\geq \Omega(B)$ and    $\Lambda^{(t)}_{5,j,r}\ll -\varrho$ for $r\in \hat{\fA}_{j}\setminus\{r_{g_1\cdot y}\}$.
        \end{enumerate}

       \item $\ell=2$, if  $\Zb^{2,\ell-1}\notin \tilde{\cE}_2$, then
                \begin{enumerate}[(a)]
                    \item if $g_1=g_2\wedge y_0\neq y_1$,
                    \begin{enumerate}
                        \item for $j=j_2$, $\Lambda^{(t)}_{5,j_2,r_{g_2\cdot y_1}}=(1-\ate^{2,2})\cdot B\pm O(\delta)$ and $\Lambda^{(t)}_{5,j_2,r}\ll -\varrho$ for $r\in \hat{\fA}_{j_2}\setminus\{r_{g_2\cdot y_1}\}$;
                        \item for $j=\tau(g_2(y_0))$, $\Lambda^{(t)}_{5,j_2,r_{g_2\cdot y_1}}-\Lambda^{(t)}_{5,j,r_{g_2\cdot y_0}}=2(\attn^{(t)}_{\ans,1\to\ans,1}-\attn^{(t)}_{\ans,1\to\ans,0})B\pm O(\delta)$ and $\Lambda^{(t)}_{5,j,r}\ll -\varrho$ for $r\in \hat{\fA}_{j}\setminus\{r_{g_2\cdot y_0}\}$;
                        \item for other $j\in\tau(\cY)$, assuming $j=\tau(g_2(y))$ for some $y\neq y_0,y_1$,   $\Lambda^{(t)}_{5,j_2,r_{g_2\cdot y_1}}-\Lambda^{(t)}_{5,j,r_{g_2\cdot y}}=2\attn^{(t)}_{\ans,1\to\ans,1}B\pm O(\delta)$ and $\Lambda^{(t)}_{5,j,r}\ll -\varrho$ for $r\in \hat{\fA}_{j}\setminus\{r_{g_2\cdot y}\}$.
                    \end{enumerate}
                  \item if $g_1\neq g_2\wedge y_0= y_1$,
                    \begin{enumerate}
\item for $j=j_2$, $\Lambda^{(t)}_{5,j_2,r_{g_2\cdot y_1}}=(1-\ate^{2,2})\cdot B\pm O(\delta)$ and $\Lambda^{(t)}_{5,j_2,r}\ll -\varrho$ for $r\in \hat{\fA}_{j_2}\setminus\{r_{g_2\cdot y_1}\}$;

\item for $j=\tau(g_1(y_1))=\tau(g_2(\tilde{y}))$, 
\begin{align*}
    &\Lambda^{(t)}_{5,j_2,r_{g_2\cdot y_1}}-\Lambda^{(t)}_{5,j,r_{g_1\cdot y_1}}=2(\attn^{(t)}_{\ans,1\to\pred,2}-\attn^{(t)}_{\ans,1\to\pred,1})B\pm O(\delta)\\
&\Lambda^{(t)}_{5,j_2,r_{g_2\cdot y_1}}-\Lambda^{(t)}_{5,j,r_{g_2\cdot \tilde{y}}}=2(\attn^{(t)}_{\ans,1\to\ans,0}+\attn^{(t)}_{\ans,1\to\ans,1})B\pm O(\delta), 
\end{align*}
where $r_{g_2\cdot \tilde{y}}$ is only activated if $\attn^{(t)}_{\ans,1\to\pred,2}>\frac{1}{2}$.
$\Lambda^{(t)}_{5,j,r}\ll -\varrho$ for $r\in \hat{\fA}_{j}\setminus\{r_{g_1\cdot y_1}, r_{g_2\cdot \tilde{y}}\}$;
\item for other $j\in\tau(\cY)$, assuming $j=\tau(g_2(y))$ for $y\neq y_1, \tilde{y}$, then    $r_{g_2\cdot {y}}$ is only activated if $\attn^{(t)}_{\ans,1\to\pred,2}>\frac{1}{2}$ and $$\Lambda^{(t)}_{5,j_2,r_{g_2\cdot y_1}}-\Lambda^{(t)}_{5,j_2,r_{g_2\cdot y}}=2(\attn^{(t)}_{\ans,1\to\ans,0}+\attn^{(t)}_{\ans,1\to\ans,1})B\pm O(\delta).$$
$\Lambda^{(t)}_{5,j,r}\ll -\varrho$ for $r\in \hat{\fA}_{j}\setminus\{r_{g_2\cdot y}\}$.
                    \end{enumerate}
                \item if $g_1=g_2\wedge y_0= y_1$,
                    \begin{enumerate}
                        \item for $j=j_2$, $ \hat{\fA}_{j_2}=\{r_{g_2\cdot y_1}\}$, $\Lambda^{(t)}_{5,j_2,r_{g_2\cdot y_1}}= B\pm O(\delta)$ ;
                        \item for other $j\in\tau(\cY)$, assuming $j=\tau(g_2(y))$, 
                        $$\Lambda^{(t)}_{5,j_2,r_{g_2\cdot y_1}}-\Lambda^{(t)}_{5,j_2,r_{g_2\cdot y}}=2(\attn^{(t)}_{\ans,1\to\ans,0}+\attn^{(t)}_{\ans,1\to\ans,1})B\pm O(\delta).$$  
                        and $\Lambda^{(t)}_{5,j,r}\ll -\varrho$ for $r\in \hat{\fA}_{j}\setminus\{r_{g_2\cdot y}\}$.
                    \end{enumerate}
        \end{enumerate}
    \end{enumerate}

\end{lemma}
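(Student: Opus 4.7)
The plan is to mirror the earlier case analyses for Stages 2.1 and 2.2 (Lemmas \ref{lem-s21-act-other-cyc} and \ref{lem-s22-act-other-cyc}), but now instrument them with the tighter Stage 2.3 attention bounds from Lemma \ref{lem-s23-attn-cyc}, the fixed feature structure from Lemma \ref{lem-prop-psi-cyc}, and the $\Lambda$-expansion in Lemma \ref{lem-lambda-char}. The key is that Stage 2.3 no longer enjoys the $\Theta(1/\log d)$ gap between attention scores we had earlier; instead the relevant slots can be of comparable size and we must track the exact attention-gap coefficient $2(\attn_{\ans,\ell-1\to\pred,\ell}-\attn_{\ans,\ell-1\to\pred,\ell'})$ in front of $B$.

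First I would unfold $\Lambda_{5,j,r}^{(t)}(\Zb^{2,\ell-1})$ via Lemma \ref{lem-lambda-char}(a) as a weighted sum of $V_{j,r}(g_{\ell'})$ and $V_{j,r}(y_{\ell'-1})$ against the current attention scores, with $\tilde{O}(\sigma_0)$ noise. For any neuron of the form $r=r_{g\cdot y}$, Lemma \ref{lem-prop-psi-cyc} pins down every $V_{j,r}(\cdot)$: the matched-slot contributions are $\pm B$ up to $O(\delta)$ cancellation, while all unmatched slots give $O(\delta)$ after cancellation. For $r\notin\fA_j$ we get $|V|\leq O(\delta)$ by \eqref{attn-init-prop-cyc-4}, which when multiplied by the activation weights pushes $\Lambda$ well below $-\varrho$, immediately discharging all the "$\Lambda_{5,j,r}\ll -\varrho$" side claims.

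Next, for $\ell=1$ on $\cE_1^c$ (so $g_1=g_2$), $j=j_1$ sees only one surviving neuron $r_{g_1\cdot y_0}$ with both $g$- and $y$-slots matching, giving $\Lambda=B\pm O(\delta)$; for $j=\tau(g_1(y))$ with $y\neq y_0$, the only surviving candidate is $r_{g_1\cdot y}$, whose matched slot weight is $2\attn_{\ans,0\to\pred,1}$ and unmatched weight is $\attn_{\ans,0\to\ans,0}+\attn_{\ans,0\to\pred,2}$ with opposite sign via \eqref{attn-init-prop-cyc-2}--\eqref{attn-init-prop-cyc-3}, which collapses to $(2\attn_{\ans,0\to\pred,1}-1)B\pm O(\delta)$ and hence the advertised $\Omega(B)$ gap once $\attn_{\ans,0\to\ans,0}\geq\Omega(1)$. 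For $\ell=2$ on $\tilde\cE_2^c$ I would split into three subcases (a)~$g_1=g_2,y_0\neq y_1$, (b)~$g_1\neq g_2,y_0=y_1$, (c)~$g_1=g_2,y_0=y_1$, and in each one identify the list of $r=r_{g\cdot y}$ candidates whose feature slots overlap the sampled $(g_1,g_2,y_0,y_1)$, then compute their $\Lambda$ by exactly the same cancellation bookkeeping. The announced gap expressions like $2(\attn^{(t)}_{\ans,1\to\ans,1}-\attn^{(t)}_{\ans,1\to\ans,0})B$ and $2(\attn^{(t)}_{\ans,1\to\pred,2}-\attn^{(t)}_{\ans,1\to\pred,1})B$ fall out directly from matching/unmatching slot accounting.

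The main obstacle will be subcase (b), where a single class $j=\tau(g_1(y_1))=\tau(g_2(\tilde y))$ with $\tilde y=g_2^{-1}g_1(y_1)$ is hit by two different predicate clauses through different group elements, so both candidate neurons $r_{g_1\cdot y_1}$ and $r_{g_2\cdot\tilde y}$ have nontrivial matched-slot overlap with the sampled tokens. I need to carefully account for which of the four slots (two $g$-slots and two $y$-slots, weighted by $\attn_{\ans,1\to\pred,1},\attn_{\ans,1\to\pred,2},\attn_{\ans,1\to\ans,0},\attn_{\ans,1\to\ans,1}$) is the "correct" slot for each of these two neurons, and then use Lemma \ref{lem-s23-attn-cyc}(2) together with Induction~\ref{induction-s23-cyc}(b)--(c) to ensure the second neuron $r_{g_2\cdot\tilde y}$ only rises above $-\varrho$ when $\attn^{(t)}_{\ans,1\to\pred,2}>\tfrac12$, as claimed. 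Subcases (a) and (c) are structurally easier but still require a careful sign ledger; once (b) is handled the rest is mechanical.
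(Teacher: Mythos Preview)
Your plan is correct and matches the paper's (implicit) approach: the paper does not give an explicit proof for this lemma, treating it as a routine reprise of the Stage~2.1/2.2 activation analyses (Lemmas~\ref{lem-s21-act-other-cyc}, \ref{lem-s22-act-other-cyc}) with the Stage~2.3 attention bounds swapped in, and your expansion via Lemma~\ref{lem-lambda-char}(a) plus the $\pm B$ feature ledger of Lemma~\ref{lem-prop-psi-cyc} is exactly the right mechanism. Your identification of subcase~2(b) as the only nontrivial case, and the reason (two distinct neurons $r_{g_1\cdot y_1}$ and $r_{g_2\cdot\tilde y}$ can both see a matched slot), is spot on.

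One slip to fix: your sentence ``For $r\notin\fA_j$ we get $|V|\le O(\delta)$ \ldots\ which \ldots\ pushes $\Lambda$ well below $-\varrho$'' is doubly wrong. First, the side claims in the lemma are about $r\in\hat\fA_j\setminus\{\text{main neuron}\}$, i.e.\ neurons that \emph{are} in $\fA_j$ but are not the distinguished one; they are precisely the neurons $r_{g'\cdot y'}$ with $(g',y')\in\fF_j$ whose feature pair overlaps the sampled $(g_1,g_2,y_0,y_1)$ in only one slot. Second, for $r\notin\fA_j$ the conclusion of Lemma~\ref{lem-lambda-char}(b) is $|\Lambda|\le O(\delta)$, which is small in magnitude, not very negative. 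To discharge the side claims you must instead run the same $\pm B$ bookkeeping on those one-overlap neurons: e.g.\ in 2(a)(i), the neuron $r_{g'\cdot y_0}$ with $g'(y_0)=j_2$, $g'\neq g_2$ has $V(g_1)=V(g_2)\approx -B$, $V(y_0)\approx B$, $V(y_1)\approx -B$, giving $\Lambda\approx(2\attn_{\ans,1\to\ans,0}-1)B\ll -\varrho$ since $\attn_{\ans,1\to\ans,0}$ is small in Stage~2.3. This is mechanical once noticed, but it is not the argument you wrote.
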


\begin{lemma}\label{lem-s23-logit-other-cyc}
     If \Cref{induction-s23-cyc} holds for all iterations $\in [T_{2,2,s}, t)$, given input $\Zb^{2,\ell-1}$, then we have
         \begin{enumerate}
        \item for $\ell=1$, if $\Zb^{2,\ell-1}\notin \cE_1$, then $1-\logit^{(t)}_{5,j_1}=O\bigg(\frac{1}{d^{2\attn^{(t)}_{\ans,0\to\ans,0}C_B}}\bigg)=O\Big(\frac{1}{\poly d}\Big)$.
       \item $\ell=2$, if  $\Zb^{2,\ell-1}\notin \tilde{\cE}_2$, then
                \begin{enumerate}[(a)]
                    \item if $g_1=g_2\wedge y_0\neq y_1$, $\logit^{(t)}_{5,\tau(g_2(y_0))}=O\bigg(\frac{1}{d^{2\big(\attn^{(t)}_{\ans,1\to\ans,1}-\attn^{(t)}_{\ans,1\to\ans,0}\big)C_B}}\bigg)$.
                  \item if $g_1\neq g_2\wedge y_0= y_1$, $\logit^{(t)}_{5,\tau(g_1(y_1))}=O\bigg(\frac{1}{d^{2\big(\attn^{(t)}_{\ans,1\to\pred,2}-\attn^{(t)}_{\ans,1\to\pred,1}\big)C_B}}\bigg)$.
                \item if $g_1=g_2\wedge y_0= y_1$, $1-\logit^{(t)}_{5,j_2}=O\bigg(\frac{1}{d^{2\big(\attn^{(t)}_{\ans,1\to\ans,0}+\attn^{(t)}_{\ans,1\to\ans,1}\big)C_B}}\bigg)=\frac{1}{\poly d}$.
        \end{enumerate}
    \end{enumerate}

\end{lemma}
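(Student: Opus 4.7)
The plan is to derive each bound by applying the softmax definition of the logit and then plugging in the $\Lambda$-characterizations from \Cref{lem-s23-act-other-cyc}. Recall that $F_{5,j}(\Zb^{2,\ell-1})=\sum_{r\in[m]}\mathbf{sReLU}(\Lambda_{5,j,r})$, and for every activated neuron in the linear regime $\mathbf{sReLU}(\Lambda_{5,j,r})=\Lambda_{5,j,r}\pm O(1/\polylog d)$, while non-activated neurons contribute at most $O(\varrho/q)\cdot m = O(1/\polylog d)$. Thus $F_{5,j}$ is well approximated by the (at most $O(1)$) activated $\Lambda_{5,j,r}$ terms enumerated in \Cref{lem-s23-act-other-cyc}, up to additive $O(\delta)$.

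With that reduction, each item becomes a direct computation. For item (1) ($\ell=1$, $\Zb^{2,0}\notin\cE_1$, i.e.\ $g_1=g_2$): \Cref{lem-s23-act-other-cyc}(1) gives a unique activated neuron $r_{g_1\cdot y_0}$ at $j_1$ with $\Lambda=B\pm O(\delta)$, while every $j=\tau(g_1(y))\ne j_1$ has unique activation $\Lambda\le F_{5,j_1}-2\attn^{(t)}_{\ans,0\to\ans,0}B\pm O(\delta)$. Writing $1-\logit^{(t)}_{5,j_1}\le\sum_{j\ne j_1}\exp(F_{5,j}-F_{5,j_1})+O(d\cdot e^{-F_{5,j_1}})$, the $|\cY|-1=O(\log d)$ class terms each contribute at most $d^{-2\attn^{(t)}_{\ans,0\to\ans,0}C_B}$, and the $O(d)$ non-target terms contribute $O(d\cdot e^{-B})=1/\poly d$ because $B=C_B\log d$ with $C_B>5$. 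For items (2a), (2b), and (2c), I proceed in the same way using the corresponding cases in \Cref{lem-s23-act-other-cyc}: in (2a) the dominant denominator term is $e^{F_{5,j_2}}$ and the gap $F_{5,j_2}-F_{5,\tau(g_2(y_0))}=2(\attn^{(t)}_{\ans,1\to\ans,1}-\attn^{(t)}_{\ans,1\to\ans,0})B\pm O(\delta)$ yields exactly the claimed bound; (2b) is analogous using the $(\pred,2)$ vs.\ $(\pred,1)$ attention difference; (2c) is the ``pure repeat'' case where $j_2$ has a unique saturated activation at level $B$ and all other classes have the maximum activation bounded above by $B-2(\attn^{(t)}_{\ans,1\to\ans,0}+\attn^{(t)}_{\ans,1\to\ans,1})B\pm O(\delta)$.

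In each case, one must also verify that the contribution from classes $j\notin\tau(\cY)$ is negligible. By \Cref{lem-prop-irrelavant-cyc} such classes have $|\langle\Wb_{5,j,r,p},e_v\rangle|\le\tilde O(\sigma_0)$, so $F_{5,j}\le m\cdot\tilde O(\sigma_0^q)=1/\poly d$, and summing over all $O(d)$ of them against the exponent of the dominant class gives a $1/\poly d$ additive correction; choosing $C_B$ large enough ensures this is dominated by the named exponential factors in each statement. Finally, in the two places where the lemma simplifies the bound to $1/\poly d$, I simply lower-bound the attention quantity in the exponent using \Cref{induction-s23-cyc,lem-s23-attn-cyc}: $\attn^{(t)}_{\ans,0\to\ans,0}=\Omega(1)$ for item (1), and $\attn^{(t)}_{\ans,1\to\ans,0}+\attn^{(t)}_{\ans,1\to\ans,1}=1-\attn^{(t)}_{\ans,1\to\pred,1}-\attn^{(t)}_{\ans,1\to\pred,2}\ge\Omega(1)$ for item (2c).

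The main obstacle is bookkeeping rather than any genuine technical difficulty: in items (2a)--(2b) several \emph{off-cycle} classes $\tau(g_2(y))$ can become activated in the linear regime once $\attn^{(t)}_{\ans,1\to\pred,2}$ crosses $1/2$, so one must check that they enter the denominator of the softmax with a larger exponent than the target logit, which is exactly what the gap estimates in \Cref{lem-s23-act-other-cyc} guarantee. Once those gaps are in hand the argument is a uniform ``dominant term + geometric tail'' computation, so no new ideas beyond the existing $\Lambda$-characterization lemmas are required.
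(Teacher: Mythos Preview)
Your proposal is correct and takes essentially the same approach as the paper. The paper omits the proof of this lemma entirely (as it does for the analogous \Cref{lem-s21-logit-other-cyc} and \Cref{lem-s22-logit-other-cyc}), relying on the template established in the explicit proof of \Cref{lem-s21-logit-cyc}: compute $F_{5,j}$ from the $\Lambda$-characterizations in \Cref{lem-s23-act-other-cyc}, then read off the softmax gap. Your sketch follows exactly this template, and your remark about the extra activated neurons when $\attn^{(t)}_{\ans,1\to\pred,2}>\tfrac12$ correctly identifies the only nontrivial bookkeeping, which is indeed handled by the gap estimates already recorded in \Cref{lem-s23-act-other-cyc}.
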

\subsubsection{Gradient Lemma}
  \begin{lemma}\label{lem-s23-gd1-cyc}
      If \Cref{induction-s23-cyc} holds for all iterations $\in [T_{2,2,s}, t)$, given $s\in\tau(\X)$,  for $[\Qb_{4,4}]_{s,s}$, we have
      \begin{align*}
         \sum_{\ell=1}^2 \Big[-\nabla_{\Q^{(t)}_{4,4}}{\Loss^{2,\ell}_{5}}\Big]_{s,s} \geq \Omega\Big(\frac{\epsilon\log d}{d^{(1-2\epsilon)C_B}}\Big).
      \end{align*}
  
  \end{lemma}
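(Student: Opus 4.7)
}
The strategy mirrors the gradient-lower-bound arguments in Stages~2.1--2.2 (\Cref{lem-s21-gd1-cyc,lem-s22-gd2-cyc}), but now the quantitative driver is the random attention-concentration degree $\ate^{2,2}$ rather than the absolute size of $[\Qb_{4,3}]_{s,s}$. First I would apply the decomposition
\[
\sum_{\ell=1}^{2}\bigl[-\nabla_{\Q^{(t)}_{4,4}}\Loss^{2,\ell}_{5}\bigr]_{s,s}
\;=\;\sum_{\ell\in[2]}\sum_{\kappa\in\{\rom1,\rom2,\rom3\}}\cN^{(t)}_{s,4,\ell,\kappa},
\]
and argue, exactly as in \Cref{lem-s22-gd1-cyc}, that the $\rom3$-terms are $O(1/\poly d)$, and that the $\ell=1$ contributions are bounded by $O(\cN^{(t)}_{s,3,1,\rom1}/\log d)$ because \Cref{lem-s23-logit-cyc}/\Cref{lem-s23-logit-other-cyc} make $1-\logit^{(t)}_{5,j_1}$ already $1/\poly d$-small throughout Stage~2.3. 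Thus the lower bound reduces to controlling $\cN^{(t)}_{s,4,2,\rom1}+\cN^{(t)}_{s,4,2,\rom2}$.

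The core positive term is $\cN^{(t)}_{s,4,2,\rom1}$. Using \Cref{lem-s23-attn-cyc}(2a) I get $\attn^{(t)}_{\ans,1\to\ans,1}=\Omega(1)$; \Cref{lem-s23-act-cyc}(2) isolates the single activated neuron $r_{g_2\cdot y_1}$; by \Cref{lem-prop-psi-cyc} and \Cref{lem-s23-lambda-cyc}, $V_{j_2,r_{g_2\cdot y_1}}(y_1)-\Lambda^{(t)}_{5,j_2,r_{g_2\cdot y_1}}=2\,\ate^{2,2}\cdot B\pm O(\delta)$; and \Cref{lem-s23-logit-cyc} gives $1-\logit^{(t)}_{5,j_2}\ge \Omega(d^{-(C_B(1-2\ate^{2,2})-1)})$. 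Hence, conditionally on $\{\tau(x_1)=s\}\cap\tilde\cE_2$ and on a realization with $\ate^{2,2}=\alpha$,
\[
\cN^{(t)}_{s,4,2,\rom1}\bigm|_{\alpha}\;\gtrsim\;
\frac{1}{d}\cdot\alpha\,B\cdot\frac{1}{d^{\,C_B(1-2\alpha)-1}}.
\]
The induction hypothesis $\E[\ate^{2,2}\mid\tau(x_1)=s]>\epsilon$ says that, even after removing the at-most-$\epsilon/2$ mass where $\ate^{2,2}<\epsilon/2$, the conditional expectation of $\ate^{2,2}$ is still $\Omega(\epsilon)$. Restricting the integral to this event and applying the monotonicity $\alpha\mapsto d^{-C_B(1-2\alpha)}\ge d^{-C_B(1-2\epsilon)}$ yields
\[
\cN^{(t)}_{s,4,2,\rom1}\;\ge\;\Omega\!\Bigl(\frac{\epsilon\log d}{d^{(1-2\epsilon)C_B}}\Bigr),
\]
which is the claimed bound.

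Finally, I must show $\cN^{(t)}_{s,4,2,\rom2}$ does not eat this positive contribution. Splitting into $\tilde\cE_2$ and $\tilde\cE_2^c$ and invoking \Cref{lem-s23-logit-cyc,lem-s23-logit-other-cyc}: on $\tilde\cE_2$ the confounding logit $\logit^{(t)}_{5,j'_2}$ carries the suppression factor $d^{-2(\attn^{(t)}_{\ans,1\to\ans,1}-\attn^{(t)}_{\ans,1\to\ans,0})C_B}$, which by \Cref{induction-s23-cyc}(b) together with the $\tilde O(1/d)$-balance from \Cref{lem-s23-attn-cyc}(2b) is smaller than the corresponding factor for $1-\logit^{(t)}_{5,j_2}$; on $\tilde\cE_2^c$ the three sub-cases all contribute $O(1/\poly d)$. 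Each contribution is therefore dominated by the positive $\rom1$-term up to a constant, so the net gradient remains $\Omega(\epsilon\log d/d^{(1-2\epsilon)C_B})$.

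\paragraph{Main obstacle.} The delicate step is the reduction from the expected concentration bound $\E[\ate^{2,2}\mid\tau(x_1)=s]>\epsilon$ to a pointwise lower bound on $\cN^{(t)}_{s,4,2,\rom1}$: because $\alpha\mapsto d^{-C_B(1-2\alpha)}$ is highly nonlinear in $\alpha$, naive application of Jensen's inequality goes the wrong way. One must carefully \emph{thin} the integration domain to $\{\ate^{2,2}\in[\epsilon/2,\ O(1)]\}$ and verify, using \Cref{induction-s23-cyc}(c) (so that $\ate^{2,2}$ is not concentrated near $\tfrac12$) and a standard truncation argument, that the surviving mass is still $\Omega(\epsilon)$. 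This requires that the attention gap $\Delta^{2,2}$ not blow up, which is exactly the non-negativity part of \Cref{induction-s23-cyc}(b); closing the induction therefore couples the present bound with the forthcoming gap-stability lemma.
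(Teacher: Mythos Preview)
Your overall decomposition and the computation of the positive driver $\cN^{(t)}_{s,4,2,\rom1}$ match the paper. However, your treatment of $\cN^{(t)}_{s,4,2,\rom2}$ has the signs backwards and misses the actual difficulty.

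On $\tilde\cE_2$ there is nothing to suppress: the only activated neurons for $j\neq j_2$ are of the form $r_{g_2\cdot y}$ with $y\neq y_1$, so $V_{j,r_{g_2\cdot y}}(y_1)\approx -B$ by \eqref{attn-init-prop-cyc-2}, hence $V-\Lambda<0$ and, after the leading minus sign in \eqref{eq-def-N-s-4-2-2}, every such contribution to $\cN^{(t)}_{s,4,2,\rom2}$ is \emph{nonnegative}. Your suppression-factor comparison for $\logit^{(t)}_{5,j'_2}$ is unnecessary. The genuinely negative piece lives on $\tilde\cE_2^c$, specifically on $\{g_1\neq g_2,\,y_0=y_1\}$ with $j=\tau(g_1(y_1))$, where $V_{j,r_{g_1\cdot y_1}}(y_1)\approx B$. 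Dismissing this as ``$O(1/\poly d)$'' is insufficient: the positive bound you are aiming for, $\Omega(\epsilon\log d/d^{(1-2\epsilon)C_B})$, is itself $1/\poly d$, so you must show the negative term is \emph{relatively} small. The paper does this via a two-regime split on $\E[C_B(1-2\ate^{2,2})\mid\tau(x_1)=s]$: in the small-$\ate$ regime it uses $2(\attn_{\ans,1\to\pred,2}-\attn_{\ans,1\to\pred,1})C_B\geq (1-2\ate^{2,2})C_B$ to get $\logit^{(t)}_{5,\tau(g_1(y_1))}\leq O(1/d)(1-\logit^{(t)}_{5,j_2})$, and then a further case split on whether $\Lambda^{(t)}_{5,\tau(g_1(y_1)),r_{g_1\cdot y_1}}$ lies in the linear or smoothed regime to tie $\ReLU'$ back to $\ate^{2,2}$.

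Finally, your ``main obstacle'' (Jensen for the positive term) is a non-issue. By \Cref{induction-s23-cyc}(d), the off-diagonals of $\Qb_{4,p}$ are $\tilde O(1/d)$, so $\ate^{2,2}$ is essentially \emph{deterministic} once you condition on $\tau(x_1)=s$; there is no nontrivial distribution over $\alpha$ to integrate, and the lower bound follows directly from monotonicity of $\alpha\mapsto \alpha\,d^{-(1-2\alpha)C_B}$ at the single value $\alpha_s>\epsilon$.
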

  \begin{proof}
       By gradient decomposition, we have 
    \begin{align*}
           & \sum_{\ell=1}^2 \Big[-\nabla_{\Q^{(t)}_{4,4}}\Loss^{2,\ell}_{5}\Big]_{s,s}= \sum_{\ell\in [2]}\sum_{\kappa\in \{\rom1,\rom2,\rom3\}}\cN^{(t)}_{s,4,\ell,\kappa}.
    \end{align*}
Firstly, for $\cN^{(t)}_{s,4,2,\rom1}$, by \Cref{lem-s23-act-cyc} and \Cref{lem-s23-act-other-cyc}, we have
    \begin{align}
       \cN^{(t)}_{s,4,2,\rom1}&= \E\Bigg[
    \attn^{(t)}_{{\ans,1} \rightarrow \ans,1} \cdot (1-\logit^{(t)}_{5,j_2})\cdot \notag\\
    &~~~~~~~~~  \Big(
 \Big( V_{j_2,  r_{g_2\cdot y_1}}(y_1)- \Lambda^{(t)}_{5,j_2, r_{g_2\cdot y_1}}\pm\tilde{O}(\sigma_0) \Big)\pm \tilde{O}(\delta^{q}) \Big) 
    \1_{\{\tau(x_1)=s\}\cap \tilde{\cE}_2}\Bigg] \notag\\
    &+ \E\Bigg[
    \attn^{(t)}_{{\ans,1} \rightarrow \ans,1} \cdot (1-\logit^{(t)}_{5,j_2})\cdot \notag\\
    &~~~~~~~~~  \Big(
 \Big( V_{j_2,  r_{g_2\cdot y_1}}(y_1)- \Lambda^{(t)}_{5,j_2, r_{g_2\cdot y_1}}\pm\tilde{O}(\sigma_0) \Big)\pm \tilde{O}(\delta^{q}) \Big) 
    \1_{\{\tau(x_1)=s\}\cap \tilde{\cE}^c_2}\Bigg] \notag\\
    &\stackrel{(a)}{\geq }\Theta\Big(\frac{1}{d}\Big)\cdot \E\bigg[\min\Big\{\Omega(1), \Omega\Big(\frac{1}{d^{C_B\cdot(1-2\ate^{2,2})-1}}\Big)\Big\} \cdot 2\ate^{2,2}\cdot  B \mid \tau(x_1)=s \bigg] \notag
    \\
    &\geq \Omega\Big(\frac{\epsilon\log d}{d^{(1-2\epsilon)C_B}}\Big),  \label{eq-s23-gd1-cyc-1}
\end{align}
where (a) follows from \Cref{lem-s23-logit-cyc}. Noticing that $\cN^{(t)}_{s,4,1,\rom1}>0$ and $|\cN^{(t)}_{s,4,\ell,\rom3}|$ for $\ell\in[2]$ is sufficiently small, to provide a lower bound for $\sum_{\ell=1}^2 \Big[-\nabla_{\Q^{(t)}_{4,4}}\Loss^{2,\ell}_{5}\Big]_{s,s}$, we only need to focus on the negative gradient from $\cN^{(t)}_{s,4,1,\rom2}$ and $\cN^{(t)}_{s,4,2,\rom2}$.
\begin{align}
&\cN^{(t)}_{s,4,2,\rom2}\notag=\\
&-\E\Bigg[
    \attn^{(t)}_{{\ans,1} \rightarrow \ans,1} \cdot\logit^{(t)}_{5,j'_2} \cdot \notag \\
    &~~~~~~~~~~\Big(\Big( V_{j'_2, r_{g_2\cdot y_0}}(y_1)- \Lambda^{(t)}_{5,j'_2,r_{g_2\cdot y_0}}\pm\tilde{O}(\sigma_0) \Big)\pm \tilde{O}(\delta^{q}) \Big) 
\1_{\{\tau(x_1)=s\}\cap \tilde{\cE}_2}\Bigg] \notag \\
&-\E\Bigg[
    \attn^{(t)}_{{\ans,1} \rightarrow \ans,1} \cdot\sum_{j\neq j_2\in\tau(\cY)}\logit^{(t)}_{5,j} \cdot \notag\\
    &~~~~~~~~~~\Big(\sum_{r\in\hat{\fA}_{j}}\ReLU^{\prime}(\Lambda^{(t)}_{5,j,r
    })\cdot  \Big( V_{j, r}(y_1)- \Lambda^{(t)}_{5,j,r}\pm\tilde{O}(\sigma_0) \Big)\pm \tilde{O}(\delta^{q}) \Big) 
\1_{\{\tau(x_1)=s\}\cap\tilde{\cE}^c_2}\Bigg] \notag\\
&\geq -\E\Bigg[
    \attn^{(t)}_{{\ans,1} \rightarrow \ans,1} \cdot \logit^{(t)}_{5,\tau(g_1(y_1))} \cdot \Big(\ReLU^{\prime}(\Lambda^{(t)}_{5,\tau(g_1(y_1)),r_{g_1\cdot y_1}
    })\cdot\notag\\
    &~~~~~~~~~~ \Big( V_{\tau(g_1(y_1)), r_{g_1\cdot y_1}}(y_1)- \Lambda^{(t)}_{5,\tau(g_1(y_1)),r_{g_1\cdot y_1}}\pm\tilde{O}(\sigma_0) \Big)\pm \tilde{O}(\delta^{q}) \Big) 
\1_{\{\tau(x_1)=s\}\cap \{g_1\neq g_2, y_0=y_1\}}\Bigg].\label{eq-s23-gd1-cyc-2}
\end{align}
    When $\E\Big[C_B\cdot(1-2\ate^{2,2})\mid \tau(x_1)=s\Big]<1$, by \Cref{lem-s23-logit-cyc} and \Cref{lem-s23-logit-other-cyc}, we have
    \begin{align*}
&\logit^{(t)}_{5,\tau(g_1(y_1))}\1_{\{\tau(x_1)=s\}\cap \{g_1\neq g_2, y_0=y_1\}}\\
&\leq O\bigg(\frac{1}{d^{2\big(\attn^{(t)}_{\ans,1\to\pred,2}-\attn^{(t)}_{\ans,1\to\pred,1}\big)C_B}}\bigg)\big(1-\logit^{(t)}_{5,j_2}\big)\1_{\{\tau(x_1)=s\}\cap \tilde{\cE}_2}.
    \end{align*}
    During this time,  $\ate^{2,2}\geq \Omega(1)$, thus 
    we can lower bound $\cN^{(t)}_{s,4,2,\rom2}$ by $\cN^{(t)}_{s,4,2,\rom2}\notag\geq - O\bigg(\frac{1}{d^{\Omega(1)}\cdot \log d}\bigg)\cdot \cN^{(t)}_{s,4,2,\rom1}$,
which implies that the negative gradient from $\cN^{(t)}_{s,4,2,\rom2}$ is dominated by the positive gradient from $\cN^{(t)}_{s,4,2,\rom1}$.
 
When $\E\Big[C_B\cdot(1-2\ate^{2,2})\mid \tau(x_1)=s\Big]\geq 1$, since $2\big(\attn^{(t)}_{\ans,1\to\pred,2}-\attn^{(t)}_{\ans,1\to\pred,1}\big)C_B\geq(1-2\eta^{2,2})C_B$, we obtain
\begin{align}
\logit^{(t)}_{\tau(g_1(y_1))}\1_{\{\tau(x_1)=s\}\cap \{g_1\neq g_2, y_0=y_1\}}\leq O(\frac{1}{d})\cdot (1-\logit^{(t)}_{j_2})\1_{\{\tau(x_1)=s\}\cap \tilde{\cE}_2}. \label{eq-s23-gd1-cyc-3}
\end{align}  
For the event $\{g_1\neq g_2, y_0=y_1\}$, if $\Lambda^{(t)}_{5,\tau(g_1(y_1)), r_{g_1\cdot y_1}}$ is still in the linear regime, we have
            \begin{align*}             \Lambda^{(t)}_{5,\tau(g_1(y_1)), r_{g_1\cdot y_1}}= \big(1-2\attn^{(t)}_{\ans,1\to\pred,2}\big)\cdot B \pm O(\delta)  \geq \varrho, 
            \end{align*}
            which implies 
            \begin{align}
                \ate^{2,2}\geq 1-2\attn_{\ans,1\to\pred,2}\geq \Omega\Big(\frac{\varrho}{B}\Big). \label{eq-s23-gd1-cyc-4}
            \end{align}
Hence, putting   \eqref{eq-s23-gd1-cyc-4} back to \eqref{eq-s23-gd1-cyc-2}, and putting \eqref{eq-s23-gd1-cyc-3} back to \eqref{eq-s23-gd1-cyc-2}, we can  lower bound $\cN^{(t)}_{s,4,2,\rom2}$ as follows $\cN^{(t)}_{s,4,2,\rom2}\notag\geq - O\big(\frac{1}{d}\big)\cdot \cN^{(t)}_{s,4,2,\rom1}$. If  $\Lambda^{(t)}_{5,\tau(g_1(y_1)), r_{g_1\cdot y_1}}$ falls into the smoothed regime, we can upper bound $\ReLU^{\prime}(\Lambda^{(t)}_{5,\tau(g_1(y_1)), r_{g_1\cdot y_1}})$ by $O(\frac{\ate^{2,2}B}{\varrho})$, and then similarly, we can obatian $\cN^{(t)}_{s,4,2,\rom2}\notag\geq - O\big(\frac{1}{d}\big)\cdot \cN^{(t)}_{s,4,2,\rom1}$.

Following the analogous analysis, the negative gradient from $\cN^{(t)}_{s,4,1,\rom2}$ can also be dominated by the positive gradient from $\cN^{(t)}_{s,4,2,\rom1}$. Hence, we complete the proof.
  \end{proof}
  \begin{lemma}
   If \Cref{induction-s23-cyc} holds for all iterations $\in [T_{2,2,s}, t)$,  given $s\in\tau(\X)$,  for $[\Qb_{4,p}]_{s,s'}$, $p\in\{3,4\}$, $s'\not=s\in\tau(\X)$, $\ell\in [2]$,  we have
    \begin{align*}
        \bigg|\Big[-\nabla_{\Q^{(t)}_{4,p}}{\Loss^{2,\ell}_{5}}\Big]_{s,s'}\bigg| \leq O\Big(\frac{1}{d}\Big) \bigg|\Big[-\nabla_{\Q^{(t)}_{4,p}}{\Loss^{2,\ell}_{5}}\Big]_{s,s}\bigg| .
    \end{align*}
  \end{lemma}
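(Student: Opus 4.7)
The plan is to adapt the off-diagonal gradient bounds already established in Stages 2.1 and 2.2 (Lemmas~\ref{lem-s21-gd5} and \ref{lem-s22-gd5-cyc}) to the Stage 2.3 regime characterized by Induction~\ref{induction-s23-cyc}. The essential observation, already visible in Lemmas~\ref{lem-refined-grad-Q43} and \ref{lem-refined-grad-Q44}, is that the diagonal entry $[\Qb_{4,p}]_{s,s}$ only enforces a single indicator $\1_{\tau(x_{\ell-1})=s}$, which occurs with probability $\Theta(1/d)$, while the off-diagonal entry $[\Qb_{4,p}]_{s,s'}$ enforces two indicators $\1_{\tau(x_0)=s,\tau(x_1)=s'}$ (or the swap), which occurs with probability $\Theta(1/d^2)$. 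Thus we expect to pick up the desired factor $O(1/d)$ from this probability ratio alone, provided the remaining factors are comparable.

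The key steps are as follows. First, I would write out each off-diagonal gradient via the refined expressions of Lemmas~\ref{lem-refined-grad-Q43}-\ref{lem-refined-grad-Q44}, noting in particular that for $\ell=1$ the block $\Qb_{4,4}$ has identically zero off-diagonal gradient, so only three cases remain: $(p,\ell)\in\{(3,1),(3,2),(4,2)\}$. For each, the non-probability factors I need to control are (i) the corresponding attention mass attaching to the opposite predicate/answer position, (ii) the correct-vs-incorrect logits, and (iii) the feature differences $V_{j,r}(\cdot)-\Lambda_{5,j,r}^{(t)}$. For (i), Lemma~\ref{lem-s23-attn-cyc} yields $\attn^{(t)}_{\ans,0\to\pred,2}\le \attn^{(t)}_{\ans,0\to\ans,0}$, $\attn^{(t)}_{\ans,1\to\pred,1},\attn^{(t)}_{\ans,1\to\ans,0}\le\attn^{(t)}_{\ans,1\to\ans,1}$, so the off-diagonal attention is bounded by the diagonal one up to a constant. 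For (ii)-(iii), I would invoke the logit and $\Lambda$-characterizations in Lemmas~\ref{lem-s23-act-cyc}-\ref{lem-s23-logit-other-cyc}, which are stated in an input-wise manner and therefore apply identically whether we condition on a single indicator or a pair.

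Second, following the decomposition into $\cN_{s,p,\ell,\mathrm{I}}+\cN_{s,p,\ell,\mathrm{II}}+\cN_{s,p,\ell,\mathrm{III}}$, the dominant terms in the diagonal bound of Lemma~\ref{lem-s23-gd1-cyc} are the positive $\cN^{(t)}_{s,4,2,\mathrm{I}}\gtrsim \frac{\epsilon \log d}{d^{(1-2\epsilon)C_B}}$ contribution for $\Qb_{4,4}$, and the analogous $\cN^{(t)}_{s,3,2,\mathrm{I}}$ for $\Qb_{4,3}$. For the off-diagonal counterparts, each such term is reduced by a factor of $\Theta(1/d)$ from the extra indicator; the negative $\cN_{\cdot,\mathrm{II}}$ terms are similarly reduced and remain dominated by the positive part as in the proof of Lemma~\ref{lem-s23-gd1-cyc}; and the $\cN_{\cdot,\mathrm{III}}$ terms are anyway $O(1/\poly d)$. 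Summing these, each off-diagonal gradient is bounded in magnitude by $O(1/d)$ times the corresponding diagonal bound, which is exactly the claim.

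The main obstacle I anticipate is not in the probabilistic factor, which is essentially free, but rather in ensuring that the negative (confounding) terms $\cN^{(t)}_{s,p,\ell,\mathrm{II}}$ for the off-diagonal gradient do not become comparable to or larger than the positive part in absolute value—especially because in the off-diagonal case we are integrating over a smaller event and the relative weight of confounding configurations could in principle change. To handle this I would mirror the argument in Lemma~\ref{lem-s23-gd1-cyc}, splitting by the magnitude of $C_B(1-2\ate^{2,2})$ and using the bound \eqref{eq-s23-gd1-cyc-3} on $\logit_{5,\tau(g_1(y_1))}$ together with the linear/smoothed regime dichotomy for $\Lambda^{(t)}_{5,\tau(g_1(y_1)),r_{g_1\cdot y_1}}$; conditioning on the extra indicator $\tau(x_0)=s'$ does not affect any of these logit or activation estimates, so the same domination argument carries through and yields the desired $O(1/d)$ bound.
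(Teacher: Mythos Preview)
Your proposal is correct and follows essentially the same approach as the paper. In fact, the paper provides no explicit proof for this lemma; the analogous Stage~2.1 result (Lemma~\ref{lem-s21-gd5}) is dispatched in one sentence by appealing to the refined gradient expressions and the extra $O(1/d)$ probability factor from the joint indicator $\1_{\tau(x_0)=s,\tau(x_1)=s'}$, and Lemma~\ref{lem-s22-gd5-cyc} is stated without proof. Your write-up is considerably more careful than the paper's treatment---in particular, your explicit check that the attention-mass, logit, and $\Lambda$-characterizations from Lemmas~\ref{lem-s23-attn-cyc}--\ref{lem-s23-logit-other-cyc} are input-wise and hence unaffected by the extra conditioning, and your domination argument for the $\cN_{\cdot,\mathrm{II}}$ terms, go beyond what the paper spells out.
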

  \subsubsection{Non-negative Gap}
    \begin{lemma}
   If \Cref{induction-s23-cyc} holds for all iterations $\in [T_{2,2,s}, t)$, then at time $t$, we have  ${\Delta^{2,\ell}}\geq 0$ for any $\Zb^{2,\ell}$ with $\ell\in\{1,2\}$.
  \end{lemma}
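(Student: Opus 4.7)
The plan is first to reduce the attention-level statement to a matrix-level statement, then prove the matrix statement by a minimal-counterexample argument. Observe that given $\Zb^{2,\ell-1}$, the softmax score from $\Zb_{\ans,\ell-1}$ to $\Zb_{\pred,\ell}$ equals $[\Qb_{4,3}]_{s,s}$ with $s=\tau(x_{\ell-1})$, while the score from $\Zb_{\ans,\ell-1}$ to itself equals $[\Qb_{4,4}]_{s,s}$, because the $4$th token of $\Zb_{\ans,\ell-1}$ and the $3$rd token of $\Zb_{\pred,\ell}$ (resp.~$4$th token of $\Zb_{\ans,\ell-1}$) coincide with $e_{\tau(x_{\ell-1})}$. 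Induction-s23-cyc~(d) controls all other entries of $\Qb_{4,3}$ and $\Qb_{4,4}$ at magnitude $O(1/d)$ times the diagonal. Combined with the softmax monotonicity, proving $\Delta^{2,\ell}\ge 0$ for every $\Zb^{2,\ell-1}$ reduces to proving
\[
[\Qb_{4,3}^{(t)}]_{s,s}\ \ge\ [\Qb_{4,4}^{(t)}]_{s,s}\qquad\text{for every } s\in\tau(\X)\text{ and every } t\in[T_{2,2,s},T_{2,3,s}).
\]

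I would prove this by contradiction. Suppose $\tilde T$ is the first iteration in $(T_{2,2,s},t]$ where the inequality fails for some $s$. At $\tilde T-1$, by the maximal per-step change of the diagonal entries of $\Qb_{4,3}$ and $\Qb_{4,4}$ (which is $\eta\cdot\tO(\log d/d)$ by \Cref{lem-s22-gd1-cyc}), we must have $[\Qb_{4,3}^{(\tilde T-1)}]_{s,s}-[\Qb_{4,4}^{(\tilde T-1)}]_{s,s}\le O(\eta\log d/d)$, so the two attention weights $\attn_{\ans,\ell-1\to\pred,\ell}$ and $\attn_{\ans,\ell-1\to\ans,\ell-1}$ differ by at most $O(\eta\log d/d)$ for any input with $\tau(x_{\ell-1})=s$. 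At such a near-critical configuration I would use \Cref{lem-refined-grad-Q43,lem-refined-grad-Q44} and compare term by term. The key observation is that the dominant positive contribution comes from $\cN_{s,3,1,\rom2}-\cN_{s,4,1,\rom2}$ on the event $\cE_1\cap\{\tau(x_0)=s\}$: the activated neuron $r_{g_2\cdot y_0}$ of the wrong class $j_1'$ has $V_{j_1',r_{g_2\cdot y_0}}(g_1)\le -\Omega(B)$ but $V_{j_1',r_{g_2\cdot y_0}}(y_0)\ge\Omega(B)$ by \Cref{lem-prop-psi-cyc}, while at near-critical configuration the two attention weights factor multiplying them are essentially equal; hence this difference is lower-bounded by $\Omega(B\cdot\logit^{(\tilde T-1)}_{5,j_1'}/d)$. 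The $\ell=2$ differences $\cN_{s,3,2,\kappa}-\cN_{s,4,2,\kappa}$ can be negative, but Induction-s23-cyc~(c) forces $\attn^{(\tilde T-1)}_{\ans,1\to\pred,2}\le 0.5+\tilde c_1$, so the near-equal attention weights $\attn_{\ans,1\to\pred,2}\approx \attn_{\ans,1\to\ans,1}$ imply $\cN_{s,3,2,\rom1}=\cN_{s,4,2,\rom1}\pm o(\log d/d^{1+\Omega(1)})$ and $\cN_{s,3,2,\rom2}-\cN_{s,4,2,\rom2}\ge -O(\logit^{(\tilde T-1)}_{5,j_2'}/d)$ using \Cref{lem-s23-logit-cyc} and the feature symmetry from \Cref{lem-symmetry-of-features-simply}. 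Finally, the $\cN_{s,\cdot,\cdot,\rom3}$ differences are $O(1/\poly d)$. Summing, we find $\sum_{\ell=1}^{2}([-\nabla_{\Qb_{4,3}}\Loss_5^{2,\ell}]_{s,s}-[-\nabla_{\Qb_{4,4}}\Loss_5^{2,\ell}]_{s,s})\ge \Omega(B\cdot\logit_{5,j_1'}^{(\tilde T-1)}/d)>0$, which contradicts the definition of $\tilde T$.

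The main obstacle is controlling the sign of $\cN_{s,3,2,\cdot}-\cN_{s,4,2,\cdot}$ as Stage 2.3 progresses: both $\logit_{5,j_1'}$ (driving the positive $\ell=1$ contribution) and $\logit_{5,j_2'}$ (driving the potentially-negative $\ell=2$ contribution) shrink polynomially, so the inequality relies on their relative rates. The key leverage is that $\logit_{5,j_1'}=\Theta(d^{-2(\attn_{\ans,0\to\pred,1}-\attn_{\ans,0\to\pred,2})C_B})$ decays with a strictly smaller exponent than $\logit_{5,j_2'}=\Theta(d^{-2(\attn_{\ans,1\to\ans,1}-\attn_{\ans,1\to\ans,0})C_B}+d^{-(1-\Delta^{2,2})C_B})$, because Induction-s23-cyc~(c) together with Lem-s23-attn-cyc forces the $\ell=1$ predicate-to-predicate gap to remain smaller than the $\ell=2$ answer-to-answer gap throughout Stage 2.3. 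This separation of rates is what ultimately prevents $\Delta^{2,\ell}$ from dipping below zero and is the quantitative heart of the proof.
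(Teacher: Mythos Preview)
Your reduction to the matrix-level inequality $[\Qb_{4,3}]_{s,s}\ge[\Qb_{4,4}]_{s,s}$ and the first-time-fails framework are sound, but the analysis at the near-zero crossing point has a genuine gap, and the mechanism you identify is not the one that actually works.

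First, your claimed dominant positive term $\cN_{s,3,1,\rom2}-\cN_{s,4,1,\rom2}$ omits the factor $\ReLU'(\Lambda_{5,j_1',r_{g_2\cdot y_0}})$. From \Cref{lem-s23-lambda-cyc} one computes $\Lambda_{5,j_1',r_{g_2\cdot y_0}}=(1-2\attn_{\ans,0\to\pred,1})B=\ate^{2,1}B$, which in late Stage~2.3 can fall below $\varrho$; the smoothed $\ReLU'$ then shrinks your positive by a factor $(\ate^{2,1}B/\varrho)^{q-1}$, potentially to something negligible. Second, your ``separation of rates'' claim is backwards: at near-zero $\Qb$-gap, $\attn_{\ans,0\to\pred,1}-\attn_{\ans,0\to\pred,2}=(e^Q-1)/(2e^Q+1)$ while $\attn_{\ans,1\to\ans,1}-\attn_{\ans,1\to\ans,0}=(e^Q-1)/(2e^Q+2)$, so the $\ell=1$ exponent is \emph{larger}, making $\logit_{5,j_1'}$ \emph{smaller} than the comparable $\ell=2$ logit, not larger.

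The paper takes a different route: it fixes a positive threshold $\alpha$ (roughly $\epsilon^{1/(q-2)}/\poly d$) and shows the gap cannot drop below $\alpha$. The key positive contribution is not the $\ell=1$ term you use but the $\ell=2$ term $\cN_{s,3,2,\rom1}-\cN_{s,4,2,\rom1}$, which at gap $\Delta^{2,2}=\alpha$ equals $\Omega(\alpha\,\ate^{2,2}B)\cdot(1-\logit_{5,j_2})$ (linear in $\alpha$). The competing negative on $\tilde\cE_2$ is controlled by $\ReLU'(\Lambda_{5,j_2',r_{g_2\cdot y_0}})=\ReLU'(\Delta^{2,2}B)\le O((\alpha B/\varrho)^{q-1})$, which is polynomial in $\alpha$ via the smoothing and hence loses to the linear term for $\alpha$ small enough. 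The $\ell=1$ positive is used only to absorb the low-probability $\tilde\cE_2^c$ contribution, with a $1/\log d$ margin coming from the event probability rather than from any logit-rate separation. Your approach could perhaps be rescued by carefully tracking all the $\ReLU'$ factors at near-zero gap (several terms vanish simultaneously), but as written it does not close, and the mechanism it invokes is not the operative one.
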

  \begin{proof}
    Let $\tilde{T}$ denote the first time that $\E[\Delta^{2,2}\mid \tau(x_1)=s]<\alpha$, where $\alpha=\frac{\epsilon^{\frac{1}{\alpha-2}}}{\poly d}$. 

Following the analogous analysis as \Cref{lem-s22-gd3-cyc}, we have
    \begin{align*}
           & \sum_{\ell=1}^2 \Big[-\nabla_{\Q^{(\tilde{T})}_{4,3}}\Loss^{2,\ell}_{5}\Big]_{s,s}+ \sum_{\ell=1}^2 \Big[\nabla_{\Q^{(\tilde{T})}_{4,4}}\Loss^{2,\ell}_{5}\Big]_{s,s}= \sum_{\ell\in [2]}\sum_{\kappa\in \{\rom1,\rom2,\rom3\}}\cN^{(\tilde{T})}_{s,3,\ell,\kappa}-\cN^{(\tilde{T})}_{s,4,\ell,\kappa}.
    \end{align*}
We can ignore the negligible difference introduced by $\cN^{(\tilde{T})}_{s,p,\ell,\rom3}$ for $p\in\{3,4\}$ and $\ell\in[2]$. 

For $\ell=1$, since $\attn^{(\tilde{T})}_{{\ans,0} \rightarrow \pred,1} \geq \attn^{(\tilde{T})}_{{\ans,0} \rightarrow \ans,0}$, and thus it is straightforward to see that $\cN^{(\tilde{T})}_{s,3,1,\rom1}-\cN^{(\tilde{T})}_{s,4,1,\rom1}\geq 0$. Furthermore, we have
   \begin{align}
     & \cN^{(\tilde{T})}_{s,3,1,\rom2}-\cN^{(\tilde{T})}_{s,4,1,\rom2} \notag\\
     &= -\E\Bigg[
    \attn^{(\tilde{T})}_{{\ans,0} \rightarrow \pred,1} \cdot \logit^{(\tilde{T})}_{5,j'_1} \cdot \ReLU^{\prime}\big(\Lambda^{(t)}_{5,j'_1,r_{g_2\cdot y_0}}\big) \notag\\
    &~~~~~~~~~~\Big(  \Big( V_{j'_1,r_{g_2\cdot y_0}}(g_1)- \Lambda^{(\tilde{T})}_{5,j'_1,r_{g_2\cdot y_0}}\pm\tilde{O}(\sigma_0) \Big)\pm \tilde{O}(\delta^{q}) \Big) 
\1_{\{\tau(x_0)=s\}\cap \cE_1}\Bigg] \label{eq-s23-gd-gap-1}\\
&+\E\Bigg[
    \attn^{(\tilde{T})}_{{\ans,0} \rightarrow \ans,0} \cdot \logit^{(\tilde{T})}_{5,j'_1} \cdot \ReLU^{\prime}\big(\Lambda^{(\tilde{T})}_{5,j'_1,r_{g_2\cdot y_0}}\big) \notag\\
    &~~~~~~~~~~\Big(  \Big( V_{j'_1,r_{g_2\cdot y_0}}(y_0)- \Lambda^{(\tilde{T})}_{5,j'_1,r_{g_2\cdot y_0}}\pm\tilde{O}(\sigma_0) \Big)\pm \tilde{O}(\delta^{q}) \Big) 
\1_{\{\tau(x_0)=s\}\cap \cE_1}\Bigg] \label{eq-s23-gd-gap-2}\\
& -\E\Bigg[
    \attn^{(\tilde{T})}_{{\ans,0} \rightarrow \pred,1} \cdot\sum_{y\neq y_0\in\cY}\logit^{(\tilde{T})}_{5,\tau(g_1(y))} \cdot \ReLU^{\prime}\big(\Lambda^{(\tilde{T})}_{5,\tau(g_1(y)),r_{g_1\cdot y}}\big)\notag\\
    &~~~~~~~~~~\Big( \Big( V_{\tau(g_1(y)), r_{g_1\cdot y}}(g_1)- \Lambda^{(\tilde{T})}_{5,\tau(g_1(y)), r_{g_1\cdot y}}\pm\tilde{O}(\sigma_0) \Big)\pm \tilde{O}(\delta^{q}) \Big) 
\1_{\{\tau(x_0)=s\}\cap \cE^c_1}\Bigg] \notag\\
& +\E\Bigg[
    \attn^{(\tilde{T})}_{{\ans,0} \rightarrow \ans,0} \cdot\sum_{y\neq y_0\in\cY}\logit^{(\tilde{T})}_{5,\tau(g_1(y))} \cdot \ReLU^{\prime}\big(\Lambda^{(\tilde{T})}_{5,\tau(g_1(y)),r_{g_1\cdot y}}\big)\notag\\
    &~~~~~~~~~~\Big( \Big( V_{\tau(g_1(y)), r_{g_1\cdot y}}(y_0)- \Lambda^{(\tilde{T})}_{5,\tau(g_1(y)), r_{g_1\cdot y}}\pm\tilde{O}(\sigma_0) \Big)\pm \tilde{O}(\delta^{q}) \Big) 
\1_{\{\tau(x_0)=s\}\cap \cE^c_1}\Bigg]. \notag
\end{align}

    By \Cref{lem-s23-logit-other-cyc}, we obtain that $\cN^{(\tilde{T})}_{s,3,1,\rom2}-\cN^{(\tilde{T})}_{s,4,1,\rom2} $ is dominated by $\eqref{eq-s23-gd-gap-1}+\eqref{eq-s23-gd-gap-2}$.  Moreover, by \Cref{lem-s23-logit-cyc}, we have
\begin{align}  \logit^{(\tilde{T})}_{5,\tau(g_1(y))} (\Z^{2,0}) &\geq \Omega\Bigg(\frac{1}{d^{2\big(\attn^{(\tilde{T})}_{\ans,0\to\pred,1}-\attn^{(\tilde{T})}_{\ans,0\to\pred,2}\big)C_B}}\Bigg) \notag \\
    &\geq \Omega\Bigg(\frac{1}{d^{2\big(\attn^{(\tilde{T})}_{\ans,1\to\pred,2}-\attn^{(\tilde{T})}_{\ans,1\to\pred,1}\big)C_B}}\Bigg).
\label{eq-s23-logit-cyc-1}
\end{align}

For $\ell=2$, we have 
\begin{align}
    &\cN^{(\tilde{T})}_{s,3,2,\rom1}-\cN^{(\tilde{T})}_{s,4,2,\rom1}\notag\\
    &\geq  \E\Bigg[
    \attn^{(\tilde{T})}_{{\ans,1} \rightarrow \pred,2} \cdot (1-\logit^{(\tilde{T})}_{5,j_2})\cdot \notag\\
    &~~~~~~~~~  \Big(
 \Big( V_{j_2,  r_{g_2\cdot y_1}}(g_2)- \Lambda^{(\tilde{T})}_{5,j_2, r_{g_2\cdot y_1}}\pm\tilde{O}(\sigma_0) \Big)\pm \tilde{O}(\delta^{q}) \Big) 
    \1_{\{\tau(x_1)=s\}\cap \tilde{\cE}_2}\Bigg] \notag\\
    &- \E\Bigg[
    \attn^{(\tilde{T})}_{{\ans,1} \rightarrow \ans,1} \cdot (1-\logit^{(\tilde{T})}_{5,j_2})\cdot \notag\\
    &~~~~~~~~~  \Big(
 \Big( V_{j_2,  r_{g_2\cdot y_1}}(y_1)- \Lambda^{(\tilde{T})}_{5,j_2, r_{g_2\cdot y_1}}\pm\tilde{O}(\sigma_0) \Big)\pm \tilde{O}(\delta^{q}) \Big) 
    \1_{\{\tau(x_1)=s\}\cap \tilde{\cE}_2}\Bigg] \notag\\
    &\geq \Omega \big(\alpha\epsilon \log d\big) \E\bigg[(1-\logit^{(\tilde{T})}_{5,j_2})\cdot \1_{\tau(x_1)=s}\bigg],\label{eq-s23-gd-gap-3}
\end{align}
where the last inequality follows from the definition of $\tilde{T}$. 
\begin{align}
       &\cN^{(\tilde{T})}_{s,3,2,\rom2}-\cN^{(\tilde{T})}_{s,4,2,\rom2}\notag\\
       &= -\E\Bigg[
    \attn^{(\tilde{T})}_{{\ans,1} \rightarrow \pred,2} \cdot \logit^{(\tilde{T})}_{5,j'_2} \cdot \ReLU^{\prime}\big(\Lambda^{(\tilde{T})}_{5,j'_2,r_{g_2\cdot y_0}}\big)  \notag \\
    &~~~~~~~~~~\Big( \Big( V_{j'_2, r_{g_2\cdot y_0}}(g_2)- \Lambda^{(\tilde{T})}_{5,j'_2,r_{g_2\cdot y_0}}\pm\tilde{O}(\sigma_0) \Big)\pm \tilde{O}(\delta^{q}) \Big) 
\1_{\{\tau(x_1)=s\}\cap \tilde{\cE}_2}\Bigg] \label{eq-s23-cyc-gd1}\\
&+\E\Bigg[
    \attn^{(\tilde{T})}_{{\ans,1} \rightarrow \ans,1} \cdot\logit^{(\tilde{T})}_{5,j'_2} \cdot \ReLU^{\prime}\big(\Lambda^{(\tilde{T})}_{5,j'_2,r_{g_2\cdot y_0}}\big) \notag \\
    &~~~~~~~~~~\Big(\Big( V_{j'_2, r_{g_2\cdot y_0}}(y_1)- \Lambda^{(\tilde{T})}_{5,j'_2,r_{g_2\cdot y_0}}\pm\tilde{O}(\sigma_0) \Big)\pm \tilde{O}(\delta^{q}) \Big) 
\1_{\{\tau(x_1)=s\}\cap \tilde{\cE}_2}\Bigg] \label{eq-s23-cyc-gd2}\\
&-\E\Bigg[
    \attn^{(\tilde{T})}_{{\ans,1} \rightarrow \pred,2} \cdot\sum_{j\neq j_2\in\tau(\cY)}\logit^{(\tilde{T})}_{5,j} \cdot \notag\\
    &~~~~~~~~~~\Big(\sum_{r\in\hat{\fA}_{j}}\ReLU^{\prime}(\Lambda^{(\tilde{T})}_{5,j,r
    })\cdot  \Big( V_{j, r}(g_2)- \Lambda^{(\tilde{T})}_{5,j,r}\pm\tilde{O}(\sigma_0) \Big)\pm \tilde{O}(\delta^{q}) \Big) 
\1_{\{\tau(x_1)=s\}\cap \tilde{\cE}^c_2}\Bigg] \label{eq-s23-cyc-gd3}\\
&+\E\Bigg[
    \attn^{(\tilde{T})}_{{\ans,1} \rightarrow \ans,1} \cdot\sum_{j\neq j_2\in\tau(\cY)}\logit^{(\tilde{T})}_{5,j} \cdot \notag\\
    &~~~~~~~~~~\Big(\sum_{r\in\hat{\fA}_{j}}\ReLU^{\prime}(\Lambda^{(\tilde{T})}_{5,j,r
    })\cdot  \Big( V_{j, r}(y_1)- \Lambda^{(\tilde{T})}_{5,j,r}\pm\tilde{O}(\sigma_0) \Big)\pm \tilde{O}(\delta^{q}) \Big) 
\1_{\{\tau(x_1)=s\}\cap\tilde{\cE}^c_2}\Bigg]. \label{eq-s23-cyc-gd4}
\end{align}
Notice that by \Cref{lem-s23-lambda-cyc} and the definition of $\tilde{T}$, we have 
\begin{align*}
    |\eqref{eq-s23-cyc-gd1}+\eqref{eq-s23-cyc-gd2}|\leq O\big((\alpha \log d)^{q-1}\big)\cdot \E\bigg[\Big(1-\logit^{(\tilde{T})}_{5,j_2}\Big)\cdot \1_{\tau(x_1)=s}\bigg],
\end{align*}
which is dominated by \eqref{eq-s23-gd-gap-3} due to the choice of $\alpha$.

Furthermore, by \Cref{lem-s23-act-other-cyc}, for $\tilde{\cE}^c_2$, we only need to focus on the output $\logit^{(t)}_{5,\tau(g_2(y_0))}$  from the case  $g_1=g_2\wedge y_0\neq y_1$, and obtain
\begin{align*}
   &\eqref{eq-s23-cyc-gd3}+\eqref{eq-s23-cyc-gd4} \\&\geq -\E\Bigg[
    \attn^{(\tilde{T})}_{{\ans,1} \rightarrow \pred,2} \cdot  \logit^{(\tilde{T})}_{5,\tau(g_2(y_0))}  \cdot \ReLU^{\prime}\big(\Lambda^{(\tilde{T})}_{5,\tau(g_2(y_0)),r_{g_2\cdot y_0}}\big) \notag \\
    &~~~\Big(\Big( V_{\tau(g_2(y_0)), r_{g_2\cdot y_0}}(g_2)- \Lambda^{(\tilde{T})}_{5,\tau(g_2(y_0)),r_{g_2\cdot y_0}}\pm\tilde{O}(\sigma_0) \Big)\pm \tilde{O}(\delta^{q}) \Big) 
\1_{\{\tau(x_1)=s\}\cap \{g_1=g_2\wedge y_0\neq y_1\}}\Bigg] \\
&+\E\Bigg[
    \attn^{(\tilde{T})}_{{\ans,1} \rightarrow \ans,1} \cdot \logit^{(\tilde{T})}_{5,\tau(g_2(y_0))} \cdot \ReLU^{\prime}\big(\Lambda^{(\tilde{T})}_{5,\tau(g_2(y_0)),r_{g_2\cdot y_0}}\big) \notag\\
    &~~~\Big( \Big( V_{\tau(g_2(y_0)), r_{g_2\cdot y}}(y_1)- \Lambda^{(\tilde{T})}_{5,\tau(g_2(y_0)),r_{g_2\cdot y_0}}\pm\tilde{O}(\sigma_0) \Big)\pm \tilde{O}(\delta^{q}) \Big) 
\1_{\{\tau(x_1)=s\}\cap \{g_1=g_2\wedge y_0\neq y_1\}}\Bigg].
\end{align*}

By \Cref{lem-s23-logit-other-cyc}, we have 
\begin{align*}
    \logit^{(\tilde{T})}_{5,\tau(g_2(y_0))}(\Z^{2,1})=O\bigg(\frac{1}{d^{2\big(\attn^{(\tilde{T})}_{\ans,1\to\ans,1}-\attn^{(\tilde{T})}_{\ans,1\to\ans,0}\big)C_B}}\bigg).
\end{align*}

Since $\E[\Delta^{2,2}\mid \tau(x_1)=s]<\alpha$,  
where $\alpha$ is sufficiently small,  then combining with \eqref{eq-s23-logit-cyc-1}, we have $$\logit^{(\tilde{T})}_{5,\tau(g_2(y_0))}(\Z^{2,1})=O(1)\cdot \logit^{(\tilde{T})}_{5,\tau(g_2(y_0))}(\Z^{2,0}).$$
Therefore, since $\tilde{\cE}_2^c$ happens with probability $O(\frac{1}{\log d})$, we obtain that $$\eqref{eq-s23-cyc-gd3}+\eqref{eq-s23-cyc-gd4} \geq - O(\frac{1}{\log d})\Big(\cN^{(t)}_{s,3,1,\rom2}-\cN^{(t)}_{s,4,1,\rom2} \Big).$$ Putting it all together, we can conclude that when $\E[\Delta^{2,2}\mid \tau(x_1)=s]$ reaches below $\alpha$, the gap in non-decreasing direction is guaranteed, i.e., $$\sum_{\ell=1}^2 \Big[-\nabla_{\Q^{(\tilde{T})}_{4,3}}\Loss^{2,\ell}_{5}\Big]_{s,s}+ \sum_{\ell=1}^2 \Big[\nabla_{\Q^{(\tilde{T})}_{4,4}}\Loss^{2,\ell}_{5}\Big]_{s,s}>0,$$ which completes the proof.
  \end{proof}

\subsubsection{Upper Bound for $\Qb$}
  \begin{lemma}
   If \Cref{induction-s23-cyc} holds for all iterations $\in [T_{2,2,s}, t)$, given $s\in\tau(\X)$,  then at time $t$, we have  $[\Qb^{(t)}_{4,p}]_{s,s}\leq \tilde{O}(1)$ for $p\in\{3,4\}$.
  \end{lemma}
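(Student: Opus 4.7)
The plan is to combine the monotonicity built into \Cref{induction-s23-cyc}(a) with a gradient-decay estimate obtained by pushing the reasoning of \Cref{lem-s23-gd1-cyc} one step further. Since both diagonal entries are nondecreasing, it suffices to identify a ceiling $C^\star\log d$ (for a sufficiently large constant $C^\star$) above which further growth is negligible within the polynomial time budget of stage~2.3.

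The first step is to show that whenever $\min\bigl\{[\Qb^{(t)}_{4,3}]_{s,s},[\Qb^{(t)}_{4,4}]_{s,s}\bigr\}\geq C^\star\log d$, attention concentrates so sharply on the two relevant clauses that, on the high-probability events $\cE_1$ and $\tilde{\cE}_2$, we have $\ate^{2,\ell}\leq d^{-\Omega(1)}$ and $|\Delta^{2,\ell}|\leq d^{-\Omega(1)}$ for $\ell\in\{1,2\}$. Feeding this into \Cref{lem-s23-logit-cyc} yields $1-\logit^{(t)}_{5,j_\ell}$ and $\logit^{(t)}_{5,j_\ell'}$ of order $1/\poly(d)$ on the typical events; the exceptional events $\cE_1^c$, $\tilde{\cE}_2^c$ only contribute through \Cref{lem-s23-logit-other-cyc}, which already gives $1/\poly(d)$ decay.

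The second step is to substitute these logit bounds into the gradient decomposition in \Cref{lem-refined-grad-Q43,lem-refined-grad-Q44}. Each term $\cN^{(t)}_{s,p,\ell,\kappa}$ is multiplied by either $(1-\logit^{(t)}_{5,j_\ell})$ or $\logit^{(t)}_{5,j'}$ for some incorrect $j'$, so the entire gradient satisfies $\bigl|\bigl[-\nabla_{\Qb_{4,p}}\Loss^{2,\ell}_{5}\bigr]_{s,s}\bigr|\leq 1/\poly(d)$ for $p\in\{3,4\}$ and $\ell\in\{1,2\}$. Using $\eta=1/\poly(d)$ and stage length $T\leq \poly(d)/\eta$, the total increment accumulated above the threshold is bounded by $\eta\cdot T\cdot (1/\poly(d))=o(1)$, hence $[\Qb^{(t)}_{4,p}]_{s,s}\leq C^\star\log d+o(1)=\tilde{O}(1)$.

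The main obstacle will be picking $C^\star$ compatibly with the clipping constant $C_B$ from \Cref{assumption:output-bound}: the logit decay exponent in \Cref{lem-s23-logit-cyc} is proportional to $C_B$ times the attention gap, and this must be large enough to dominate the $O(B)=O(\log d)$ prefactor appearing in the $\cN$-terms so the per-step gradient is truly $1/\poly(d)$. A related subtlety is that the two diagonals could a priori grow at comparable rates even after one saturates; this is ruled out by combining \Cref{induction-s23-cyc}(b) (nonnegative $\Delta^{2,\ell}$) with the fact that the attention mass on the target clause is already bounded above by $1$, so once either $Q$-diagonal reaches $C^\star\log d$ the corresponding attention score is exponentially close to its saturated value and cannot pull further gradient.
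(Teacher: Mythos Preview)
Your overall strategy—show that once the diagonals are large the gradient becomes tiny, then integrate over the remaining time—is the right one, and it is also what the paper does. But the accounting step $\eta\cdot T\cdot(1/\poly(d))=o(1)$ does not close as written, and this is precisely where the paper's argument differs from yours.

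The issue is a mismatch of polynomials. The stage-2.3 time horizon is driven by the growth rate in \Cref{lem-s23-gd1-cyc}, which gives $\eta T=\tilde{O}(d^{(1-2\epsilon)C_B}/\epsilon)$, i.e.\ $\eta T$ itself already scales like $d^{\Theta(C_B)}$. Your gradient bound comes solely from the logit factor $1-\logit_{5,j_\ell}$ or $\logit_{5,j'}$, which by \Cref{lem-s23-logit-cyc} is $d^{-\Theta(C_B)}$; you then multiply by the bracket $(V-\Lambda)$ bounded only by $O(B)=O(\log d)$. The product $\eta T\cdot|\nabla|$ is therefore of order $d^{\Theta(C_B)}\cdot \log d\cdot d^{-\Theta(C_B)}$, which is $\tilde{O}(1)$ at best and can be $\tilde{O}(d^{1/2})$ once the $1/\epsilon$ factor is accounted for—not $o(1)$. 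Choosing $C^\star$ large does not help here, because $C^\star$ does not enter the logit exponent (that exponent is controlled by $C_B$ and the attention gap, not by how large $[\Qb_{4,p}]_{s,s}$ is).

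The paper resolves this by placing the threshold at $\Omega(\log^{1+c}d)$ rather than $C^\star\log d$. At that scale $\epsilon^{2,\ell}_{\mathsf{attn}}\le e^{-\Omega(\log^{1+c}d)}$ is super-polynomially small, and the paper uses this not through the logit but through the bracket: on the relevant events $(V_{j_\ell,r}(g_\ell)-\Lambda_{5,j_\ell,r})=\Theta(B\cdot\epsilon^{2,\ell}_{\mathsf{attn}})$, so the per-step gradient picks up an extra factor $e^{-\Omega(\log^{1+c}d)}$. That super-polynomial factor then defeats any $\poly(d)$ time horizon, giving the $\tilde{O}(1)$ ceiling. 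If you want to keep the $C^\star\log d$ threshold, you must likewise exploit the $(V-\Lambda)\approx B\cdot\epsilon^{2,\ell}_{\mathsf{attn}}=O(B\cdot d^{-C^\star})$ factor rather than bounding it by $O(B)$, and then choose $C^\star$ large relative to the (now explicit) polynomial degree in $\eta T$. As a secondary point, your claim $|\Delta^{2,\ell}|\le d^{-\Omega(1)}$ is not implied by \Cref{induction-s23-cyc}: part (c) only gives $\attn_{\ans,1\to\pred,2}\le 0.5+\tilde c_1$, which permits $\Delta^{2,\ell}=\Theta(1)$; this does not ultimately break the logit bound, but it is not established where you invoke it.
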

  \begin{proof}
    Denote the first time that $[\Qb^{(t)}_{4,3}]_{s,s}$ reaches $\Omega(\log^{1+c} d)$ for some small constant $c>0$ as $\tilde{T}$. Then by direct calculations,  we have 
\begin{align*}    &\attn^{(t)}_{\ans,0\to\pred,2}\leq O\bigg(\frac{1}{e^{\Omega(\log^{1+c} d)}}\bigg),\\
    &\attn^{(t)}_{\ans,1\to\pred,1}+\attn^{(t)}_{\ans,1\to\ans,0}\leq O\bigg(\frac{1}{e^{\Omega(\log^{1+c} d)}}\bigg).
\end{align*}
Moreover, by \Cref{lem-s23-logit-cyc}, we can simply bound the logits as follows: 
\begin{align*}
    1-\logit^{(t)}_{j_1} &\leq  O\Bigg(\frac{1}{d^{C_B\big(1-2\attn^{(t   )}_{\ans,0\to\pred,2} \big)-1}}\Bigg)\leq O\Big(\frac{1}{d^{C_B/2-1}}\Big) \text{ for } \Zb^{2,0}\in \cE_1;\\
    1-\logit^{(t    )}_{j_2}& \leq  O\Bigg(\frac{1}{d^{C_B\big(1-2\attn^{(t)}_{\ans,1\to\pred,1}-2\attn^{(t    )}_{\ans,1\to\ans,0} \big)-1}}\Bigg)\\
 &~~~+O\Bigg(\frac{1}{d^{2\big(\attn^{(t)}_{\ans,1\to\ans,1}-\attn^{(t)}_{\ans,1\to\ans,0}\big)C_B}
 }\Bigg)\leq O\Big(\frac{1}{d^{C_B/2-1}}\Big) \text{ for } \Zb^{2,1}\in \tilde{\cE}_2.\\
\end{align*}
Thus, by focusing on $\cN_{s, 3,1,\rom1}$ and  $\cN_{s, 4,1,\rom1}$, we have 
\begin{align*}
    &\Bigg|\sum_{\ell=1}^2\Big[-\nabla_{\Q^{(t)}_{4,3}}\Loss^{2,\ell}_{5}\Big]_{s,s}\Bigg|\leq \frac{1}{d}\cdot O\Big(\frac{1}{d^{C_B/2-1}}\Big)\cdot  O\bigg(\frac{1}{e^{\Omega(\log^{1+c} d)}}\bigg)\cdot B\leq O\bigg(\frac{\log d}{e^{\Omega(\log^{1+c}d)}d^{C_B/2}}\bigg), 
\end{align*}
which implies 
\begin{align*}
    \Q^{(T)}_{4,3}\leq \Q^{(\tilde{T})}_{4,3}+O\bigg(\frac{T\log d}{e^{\Omega(\log^{1+c}d)}d^{C_B/2}}\bigg)\leq \Q^{(\tilde{T})}_{4,3}+O\bigg(\frac{\poly d\cdot \log d}{e^{\Omega(\log^{1+c}d)}d^{C_B/2}}\bigg)\leq O(\log^{1+c} d).
\end{align*}

  \end{proof}

\subsubsection{Attention Upper Bound}
  \begin{lemma}
   If \Cref{induction-s23-cyc} holds for all iterations $\in [T_{2,2,s}, t)$, given $s\in\tau(\X)$,  then at time $t$, for any $\Zb^{2,1}$,  we have  $\attn^{(t)}_{\ans,1\to \pred,2}\leq 0.5+\tilde{c}_1$, where $\tilde{c}_1>0$ is some small constant. 
  \end{lemma}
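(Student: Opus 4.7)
The plan is to argue by contradiction against the bound failing. Let $\tilde{T}\ge T_{2,2,s}$ be the first iteration at which, for some sampled $\Zb^{2,1}$ with $\tau(x_1)=s$ and $\Zb^{2,1}\in\tilde{\cE}_2$, the attention $\attn^{(\tilde T)}_{\ans,1\to\pred,2}$ exceeds the target threshold $0.5+\tilde{c}_1$. Since a single gradient step perturbs every entry of $\Qb^{(t)}$ by at most $O(\eta)$ and softmax is $O(1)$-Lipschitz in its inputs, the crossing is ``clean'' in the sense that $\attn^{(\tilde T)}_{\ans,1\to\pred,2}=0.5+\tilde{c}_1+O(\eta)$. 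I would then show that at $\tilde{T}$ the gap gradient
$\sum_{\ell=1}^{2}\big[-\nabla_{\Qb^{(\tilde T)}_{4,3}}\Loss^{2,\ell}_{5}\big]_{s,s}-\sum_{\ell=1}^{2}\big[-\nabla_{\Qb^{(\tilde T)}_{4,4}}\Loss^{2,\ell}_{5}\big]_{s,s}$
is non-positive, so the difference $[\Qb^{(t)}_{4,3}]_{s,s}-[\Qb^{(t)}_{4,4}]_{s,s}$ cannot grow through $\tilde{T}$, contradicting the fact that, by Induction~\ref{induction-s23-cyc}(a), both diagonals are monotonically growing and the asserted overshoot requires the gap to strictly increase.

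The key step is to exploit the newly activated spurious neurons that appear exactly when $\attn^{(\tilde T)}_{\ans,1\to\pred,2}$ crosses $1/2$. By Lemma~\ref{lem-s23-lambda-cyc}(2)(c), for each $y\in\cY\setminus\{y_0,y_1\}$ the preactivation
\[
\Lambda^{(\tilde T)}_{5,\tau(g_2(y)),r_{g_2\cdot y}}=\big(2\attn^{(\tilde T)}_{\ans,1\to\pred,2}-1\big)B\pm O(\delta)=2\tilde{c}_1B\pm O(\delta)
\]
has just entered the linear regime, giving $n_y-2$ additional activated neurons at wrong classes $j=\tau(g_2(y))$. Their contribution to $\cN^{(\tilde T)}_{s,3,2,\rom2}$ (using Lemma~\ref{lem-refined-grad-Q43}) is negative and of order $\Theta(\tfrac{1}{d})\sum_{y\ne y_0,y_1}\logit^{(\tilde T)}_{5,\tau(g_2(y))}\cdot(1-2\tilde c_1)B$, while the parallel contribution to $\cN^{(\tilde T)}_{s,4,2,\rom2}$ is strictly smaller in magnitude, because the corresponding bracket becomes $V(y_1)-\Lambda\ll V(g_2)-\Lambda$ by the sign structure in \eqref{attn-init-prop-cyc-3} of Lemma~\ref{lem-prop-psi-cyc}. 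Meanwhile the positive drivers $\cN^{(\tilde T)}_{s,3,2,\rom1}$ and $\cN^{(\tilde T)}_{s,4,2,\rom1}$ are essentially equal up to the factor $\attn^{(\tilde T)}_{\ans,1\to\pred,2}/\attn^{(\tilde T)}_{\ans,1\to\ans,1}$, both of order $(1-\logit^{(\tilde T)}_{5,j_2})\cdot 2\ate^{2,2}B/d$, which is small because $\ate^{2,2}\le \epsilon$ on the run-up to $T_{2,3,s}$. Adding the $\ell=1$ pieces contributes only an $O(1/d^{\Omega(1)})$ correction, handled exactly as in the proof of Lemma~\ref{lem-s22-gd3-cyc}. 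Combining these terms, I would conclude that the negative spurious contribution to $\Qb_{4,3}$ dominates the positive imbalance, yielding a non-positive gap gradient at $\tilde{T}$ and completing the contradiction.

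The hard part will be calibrating $\tilde{c}_1$: it must be chosen large enough that $2\tilde{c}_1 B\gg \varrho$ so that the spurious preactivations truly enter the linear regime where $\ReLU'=1$, yet small enough that $\logit^{(\tilde T)}_{5,\tau(g_2(y))}$ remains a lower-order quantity relative to $1-\logit^{(\tilde T)}_{5,j_2}$, so that the already-proved convergence estimates in Lemmas~\ref{lem-s23-logit-cyc}--\ref{lem-s23-logit-other-cyc} still apply. A related subtlety is that the pointwise statement ``for any $\Zb^{2,1}$'' must be bootstrapped from the diagonal-entry dynamics of $\Qb$: since $\attn^{(t)}_{\ans,1\to\pred,2}(\Zb^{2,1})$ depends only on $([\Qb_{4,3}]_{s,s},[\Qb_{4,4}]_{s,s})$ modulo $\tilde{O}(\frac{1}{d})$ corrections from off-diagonal entries via Induction~\ref{induction-s23-cyc}(d), bounding the diagonals suffices. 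Cross-event contributions from $\tilde{\cE}_2^{c}$ are handled by the same rare-event bookkeeping used in Lemma~\ref{lem-s23-gd1-cyc}, where they are always dominated by an $O(1/\log d)$ factor and thus harmless.
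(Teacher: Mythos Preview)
Your mechanism—defining a first-crossing time $\tilde T$ and exploiting the newly activated spurious neurons $r_{g_2\cdot y}$ for $y\notin\{y_0,y_1\}$—matches the paper. But the contradiction you set up does not close. You claim ``the asserted overshoot requires the gap to strictly increase,'' yet writing $Q_p=[\Qb_{4,p}]_{s,s}$ we have (up to the $\tilde O(1/d)$ off-diagonal corrections from Induction~\ref{induction-s23-cyc}(d))
\[
\attn^{(t)}_{\ans,1\to\pred,2}=\frac{e^{Q_3}}{e^{Q_3}+e^{Q_4}+2},
\]
which strictly \emph{increases} whenever $Q_3$ and $Q_4$ grow by the same amount, since the constant $2$ in the denominator shrinks relatively. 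Induction~\ref{induction-s23-cyc}(a) guarantees exactly this joint growth, so a non-positive gap gradient at $\tilde T$ cannot block $\attn_{\ans,1\to\pred,2}$ from continuing past $0.5+\tilde c_1$. The paper proves the stronger statement that $\sum_{\ell}\big[-\nabla_{\Qb_{4,3}}\Loss^{2,\ell}_5\big]_{s,s}<0$ itself at $\tilde T$ (and $\sum_{\ell}\big[-\nabla_{\Qb_{4,4}}\Loss^{2,\ell}_5\big]_{s,s}\ge 0$ via \Cref{lem-s23-gd1-cyc}), which directly forces $\attn_{\ans,1\to\pred,2}$ to decrease.

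Two further issues would prevent even the gap-gradient sign from being pinned down. First, your calibration ``$2\tilde c_1 B\gg\varrho$'' is too weak; the paper chooses $\tilde c$ with $2\tilde c\,C_B>1$, i.e.\ $\Lambda_{5,\tau(g_2(y)),r_{g_2\cdot y}}\ge 2\tilde c B>\log d$, so that the spurious logits satisfy $\sum_{y\ne y_0,y_1}\logit_{5,\tau(g_2(y))}=(1-o(1))\big(1-\logit_{5,j_2}-\logit_{5,j_2'}\big)$ and hence dominate the positive $\cN_{s,3,\ell,\rom1}$ terms. Without this, the spurious contribution is only $O(1/d)$-fraction of $(1-\logit_{5,j_2})$ and cannot flip any sign. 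Second, your comparison ``$V(y_1)-\Lambda\ll V(g_2)-\Lambda$'' is reversed: by \eqref{attn-init-prop-cyc-2} we have $V_{\tau(g_2(y)),r_{g_2\cdot y}}(y_1)\approx -B$, so $|V(y_1)-\Lambda|\approx(1+2\tilde c_1)B>(1-2\tilde c_1)B\approx|V(g_2)-\Lambda|$. (Incidentally, the claim ``$\ate^{2,2}\le\epsilon$ on the run-up to $T_{2,3,s}$'' is backwards—by definition of $T_{2,3,s}$, we have $\E[\ate^{2,2}\mid\tau(x_1)=s]>\epsilon$ for all $t<T_{2,3,s}$.)
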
 
  \begin{proof}
    Let $\tilde{T}$ denote the first time that $\E[\attn^{(t)}_{\ans,1\to \pred,2}\mid \tau(x_1)=s]$ reaches $0.5+\tilde{c}$. where $\tilde{c}>0$ is some small constant s.t., $2\tilde{c}\cdot C_B>1$. At 
$\tilde{T}$, we have $\attn^{(\tilde{T})}_{\ans,0\to \pred,1}\1_{\tau(x_0)=s}\geq \attn^{(\tilde{T})}_{\ans,1\to \pred,2}\1_{\tau(x_1)=s}$ 
 ; moreover, $\attn^{(\tilde{T})}_{\ans,1\to \ans,1}$ and $\attn^{(\tilde{T})}_{\ans,0\to \ans,0}$  is still at the constant level. 
\begin{itemize}
    \item For $\ell=1$, by \Cref{lem-s23-act-cyc} and \Cref{lem-s23-act-other-cyc}, 
    for $y\in\cY\setminus\{y_0\}$, we have only $r_{g_1\cdot y}$ has been activted to the linear regime for the prediction $\tau(g_1(y))$. Furthermore, we obtain 
    \begin{align*}       \sum_{y\in \cY\setminus\{y_0\}}\logit^{(\tilde{T})}_{5,\tau(g_1(y))}&\geq  \frac{\log d\cdot e^{2\tilde{c}C_B\log d}}{\log d\cdot e^{2\tilde{c}C_B\log d}+O(d)}\Big(1-\logit^{(\tilde{T})}_{5,j_1}\Big)\\
        &=(1-o(1))\cdot \Big(1-\logit^{(\tilde{T})}_{5,j_1}\Big).
    \end{align*}
    Thus,
  \begin{align*}
        &\Big[-\nabla_{\Q^{(\tilde{T})}_{4,3}}\Loss_{5}^{2,1}\Big]_{s,s}\\
        &= \E\Bigg[
        \attn^{(\tilde{T})}_{{\ans,0} \rightarrow \pred,1} \cdot \bigg( \Big(1-\logit^{(\tilde{T})}_{5,j_1}\Big)\cdot \Big(
 \Big( V_{j_1,  r_{g_1\cdot y_0}}(g_1)- \Lambda^{(\tilde{T})}_{5,j_1, r_{g_1\cdot y_0}}\pm\tilde{O}(\sigma_0) \Big)\pm \tilde{O}(\delta^{q}) \Big) \\
        &~~-\sum_{y\in\cY\setminus\{y_0\}}\logit^{(\tilde{T})}_{5,\tau(g_1(y))}\cdot  \Big(
 \Big( V_{\tau(g_1(y)),  r_{g_1\cdot y}}(g_1)- \Lambda^{(\tilde{T})}_{5,\tau(g_1(y)), r_{g_1\cdot y}}\pm\tilde{O}(\sigma_0) \Big)\pm \tilde{O}(\delta^{q}) \Big)\\
 &~~~\pm\frac{1}{\poly d} \Big(1-\logit^{(\tilde{T})}_{5,j_1}\Big)\cdot \tilde{O}(\sigma_0^q)
        \bigg)\1_{\tau(x_0)=s}\Bigg]\\
         &\leq  \E\Bigg[
        \attn^{(\tilde{T})}_{{\ans,0} \rightarrow \pred,1} \cdot \bigg( \Big(1-\logit^{(\tilde{T})}_{5,j_1}\Big)\cdot \Big(
 \Big( 2 \attn^{(\tilde{T})}_{{\ans,0} \rightarrow \pred,2} \cdot B\pm\tilde{O}(\sigma_0) \Big)\pm \tilde{O}(\delta^{q}) \Big) \\
        &~~-\sum_{y\in\cY\setminus\{y_0\}}\logit^{(\tilde{T})}_{5,\tau(g_1(y))}\cdot  \Big(
 \Big( 2 \attn^{(\tilde{T})}_{{\ans,0} \rightarrow \ans,0}\cdot B\pm\tilde{O}(\sigma_0) \Big)\pm \tilde{O}(\delta^{q}) \Big)\\
 &~~~\pm\frac{1}{\poly d} \Big(1-\logit^{(\tilde{T})}_{5,j_1}\Big)\cdot \tilde{O}(\sigma_0^q)
        \bigg)\1_{\tau(x_0)=s}\Bigg]<0.
    \end{align*}
    \item For $\ell=2$, for $\Zb^{2,1}\in\tilde{\cE}_2\cup \{g_1=g_2, y_0\neq y_1x\}$,  we have  
    \begin{align*}       \sum_{y\in \cY\setminus\{y_0,y_1\}}\logit^{(\tilde{T})}_{5,\tau(g_2(y))}&\geq \frac{\log d\cdot e^{2\tilde{c}C_B\log d}}{\log d\cdot e^{2\tilde{c}C_B\log d}+O(d)}\Big(1-\logit^{(\tilde{T})}_{5,j_2}-\logit^{(\tilde{T})}_{5,j_2^{\prime}}\Big)\\
        &=(1-o(1))\cdot \Big(1-\logit^{(\tilde{T})}_{5,j_2}-\logit^{(\tilde{T})}_{5,j_2^{\prime}}\Big).
    \end{align*}
    Otherwise, we  have 
       \begin{align*}       \sum_{y\in \cY\setminus\{y_1\}}\logit^{(\tilde{T})}_{5,\tau(g_2(y))}
        &=(1-o(1))\cdot \Big(1-\logit^{(\tilde{T})}_{5,j_2}-\logit^{(\tilde{T})}_{5,j_2^{\prime}}\Big).
    \end{align*}
    Therefore, similar to $\ell=1$, we obtain $\Big[-\nabla_{\Q^{(\tilde{T})}_{4,3}}\Loss_{5}^{2,2}\Big]_{s,s}<0$. 
\end{itemize}
Combing the above two cases, we have $\sum_{\ell=1}^2\Big[-\nabla_{\Q^{(\tilde{T})}_{4,3}}\Loss^{2,\ell}_{5}\Big]_{s,s}<0$. It is also clear from \Cref{lem-s23-gd1-cyc} that $\sum_{\ell=1}^2\Big[-\nabla_{\Q^{(\tilde{T})}_{4,4}}\Loss^{2,\ell}_{5}\Big]_{s,s}\geq 0$. Hence $\Attn^{(\tilde{T})}_{\ans,1\to\pred,2}$ cannot further grow once it reaches $0.5+\tilde{c}$.
  \end{proof}

\subsubsection{Decreasing Gap at the End of Convergence}
Let $\tilde{T}$ denote the first time that $\E[\ate^{2,2}\mid \tau(x_1)=s]\leq 3\epsilon$, if $\E[\Delta^{2,2}\mid \tau(x_1)=s]
\leq O\big(\frac{(d^{1.01}\epsilon)^{\frac{1}{q-1}}}{\log d}\big)$, then we can let $T^{\star}=\tilde{T}$ and stop the training. Otherwise, we have $\E[\Delta^{2,2}\mid \tau(x_1)=s]\geq \Omega\big(\frac{(d^{1.01}\epsilon)^{\frac{1}{q-1}}}{\log d}\big)$. Following the similar argument as in \Cref{lem-s23-gd1-cyc}, we have the gradient contribution from $\ell=1$ is dominated by the gradient contribution from $\ell=2$. Thus, we focus on $\ell=2$, and obtain
  \begin{align*}
        &\Big[-\nabla_{\Q^{(\tilde{T})}_{4,3}}\Loss_{5}^{2,2}\Big]_{s,s}\\
        &= \E\Bigg[
\attn^{(\tilde{T})}_{{\ans,1} \rightarrow \pred,2} \cdot \bigg( \Big(1-\logit^{(\tilde{T})}_{5,j_2}\Big)\cdot \Big(
 \Big( V_{j_2,  r_{g_2\cdot y_1}}(g_2)- \Lambda^{(\tilde{T})}_{5,j_2, r_{g_2\cdot y_1}}\pm\tilde{O}(\sigma_0) \Big)\pm \tilde{O}(\delta^{q}) \Big) \\
       &~~~~~~~~\pm\frac{1}{\poly d} \Big(1-\logit^{(\tilde{T})}_{5,j_2}\Big)\cdot \tilde{O}(\sigma_0^q)
        \bigg)\1_{\{\tau(x_0)=s\}}\Bigg]\\
         & -\E\Bigg[
\attn^{(\tilde{T})}_{{\ans,1} \rightarrow \pred,2} \cdot \bigg(\sum_{y\in\cY\setminus\{y_1\}}\logit^{(\tilde{T})}_{5,\tau(g_2(y))}\cdot  \ReLU^{\prime}\big(\Lambda^{(\tilde{T})}_{5,\tau(g_2(y)), r_{g_2\cdot y}}\big)\\
        &~~~~~~~~~~~\cdot \Big(
 \Big( V_{\tau(g_2(y)),  r_{g_2\cdot y}}(g_2)- \Lambda^{(\tilde{T})}_{5,\tau(g_2(y)), r_{g_2\cdot y}}\pm\tilde{O}(\sigma_0) \Big)\pm \tilde{O}(\delta^{q}) \Big) \bigg)\1_{\{\tau(x_0)=s\}\cap \tilde{\cE}_{2}}\Bigg]\\
        &-\E\Bigg[
    \attn^{(t)}_{{\ans,1} \rightarrow \pred,2} \cdot\sum_{j\neq j_2\in\tau(\cY)}\logit^{(t)}_{5,j} \cdot \notag\\
    &~~~~~~~~~~\Big(\sum_{r\in\hat{\fA}_{j}}\ReLU^{\prime}(\Lambda^{(t)}_{5,j,r
    })\cdot  \Big( V_{j, r}(g_2)- \Lambda^{(t)}_{5,j,r}\pm\tilde{O}(\sigma_0) \Big)\pm \tilde{O}(\delta^{q}) \Big) 
\1_{\{\tau(x_1)=s\}\cap \tilde{\cE}^c_2}\Bigg]\\
         &\leq  \E\Bigg[\Big(1-\logit^{(t)}_{5,j_2}\Big)\cdot  O(\epsilon\log d)\1_{\tau(x_1)=s}\Bigg]. 
    \end{align*}
    Turning to $\Qb_{4,4}$, we have 
 \begin{align}
        &\Big[-\nabla_{\Q^{(\tilde{T})}_{4,4}}\Loss_{5}^{2,2}\Big]_{s,s} \notag\\
        &= \E\Bigg[
\attn^{(\tilde{T})}_{{\ans,1} \rightarrow \ans,1} \cdot \bigg( \Big(1-\logit^{(\tilde{T})}_{5,j_2}\Big)\cdot \Big(
 \Big( V_{j_2,  r_{g_2\cdot y_1}}(y_1)- \Lambda^{(\tilde{T})}_{5,j_2, r_{g_2\cdot y_1}}\pm\tilde{O}(\sigma_0) \Big)\pm \tilde{O}(\delta^{q}) \Big) \notag\\
       &~~~~~~~~\pm\frac{1}{\poly d} \Big(1-\logit^{(\tilde{T})}_{5,j_2}\Big)\cdot \tilde{O}(\sigma_0^q)
        \bigg)\1_{\{\tau(x_0)=s\}}\Bigg] \notag \\
         & -\E\Bigg[
\attn^{(\tilde{T})}_{{\ans,1} \rightarrow \ans,1} \cdot \bigg(\sum_{y\in\cY\setminus\{y_1\}}\logit^{(\tilde{T})}_{5,\tau(g_2(y))}\cdot  \ReLU^{\prime}\big(\Lambda^{(\tilde{T})}_{5,\tau(g_2(y)), r_{g_2\cdot y}}\big) \notag\\
        &~~~~~~~~~~~\cdot \Big(
 \Big( V_{\tau(g_2(y)),  r_{g_2\cdot y}}(y_1)- \Lambda^{(\tilde{T})}_{5,\tau(g_2(y)), r_{g_2\cdot y}}\pm\tilde{O}(\sigma_0) \Big)\pm \tilde{O}(\delta^{q}) \Big) \bigg)\1_{\{\tau(x_0)=s\}\cap \tilde{\cE}_{2}}\Bigg] \label{eq-s23-de-gap-1}\\
        &-\E\Bigg[
    \attn^{(t)}_{{\ans,1} \rightarrow \ans,1} \cdot\sum_{j\neq j_2\in\tau(\cY)}\logit^{(t)}_{5,j} \cdot \notag\\
    &~~~~~~~~~~\Big(\sum_{r\in\hat{\fA}_{j}}\ReLU^{\prime}(\Lambda^{(t)}_{5,j,r
    })\cdot  \Big( V_{j, r}(y_1)- \Lambda^{(t)}_{5,j,r}\pm\tilde{O}(\sigma_0) \Big)\pm \tilde{O}(\delta^{q}) \Big) 
\1_{\{\tau(x_1)=s\}\cap \tilde{\cE}^c_2}\Bigg]. \label{eq-s23-de-gap-2}
    \end{align}
For \eqref{eq-s23-de-gap-1}, we have
\begin{align*}
    \eqref{eq-s23-de-gap-1} &\geq \E\Bigg[
        \attn^{(\tilde{T})}_{{\ans,1} \rightarrow \ans,1} \cdot  \bigg( -\logit^{(\tilde{T})}_{5,j^{\prime}_2}\cdot  \\
&~~~~~~\min \bigg\{\Omega(\log d),\Omega\Big(\big((\attn^{(\tilde{T})}_{\ans,1\to\pred,2}-\attn^{(\tilde{T})}_{\ans,1\to\ans,1})\log d\big)^{q-1} \log d\Big)\bigg\}\bigg)\1_{\{\tau(x_0)=s\}\cap \tilde{\cE}_{2}}\Bigg]\\
&\geq \E\Bigg[
      \Big( 1-\logit^{(\tilde{T})}_{5,j_2}\Big)  \cdot \Omega\Big(\frac{1}{d}\Big)  \cdot  \\
&~~~~~~\min \bigg\{\Omega(\log d),\Omega\Big(\big((\attn^{(\tilde{T})}_{\ans,1\to\pred,2}-\attn^{(\tilde{T})}_{\ans,1\to\ans,1})\log d\big)^{q-1} \log d\Big)\bigg\}\1_{\{\tau(x_0)=s\}\cap \tilde{\cE}_{2}}\Bigg].
\end{align*}
Moreover, for \eqref{eq-s23-de-gap-2}, we have
\begin{align*}
  \eqref{eq-s23-de-gap-2}  &\geq -\E\Bigg[
    \attn^{(\tilde{T})}_{{\ans,1} \rightarrow \ans,1} \cdot  \logit^{(\tilde{T})}_{5,\tau(g_1(y_1))}  \cdot \ReLU^{\prime}\big(\Lambda^{(\tilde{T})}_{5,\tau(g_1(y_1)),r_{g_1\cdot y_1}}\big) \notag \\
    &~~~\Big(\Big( V_{\tau(g_1(y_1)), r_{g_1\cdot y_1}}(y_1)- \Lambda^{(\tilde{T})}_{5,\tau(g_1(y_1)),r_{g_1\cdot y_1}}\pm\tilde{O}(\sigma_0) \Big)\pm \tilde{O}(\delta^{q}) \Big) 
\1_{\{\tau(x_1)=s\}\cap \{g_1\neq g_2\wedge y_0=y_1\}}\Bigg] \\
&\geq -\E\Bigg[
      \Big( 1-\logit^{(\tilde{T})}_{5,j_2}\Big)  \cdot O\Big(\frac{1}{d}\Big)  \cdot O\Big(\frac{1}{\log d}\Big) \cdot O\Big(\big(\epsilon \log d\big)^{q-1} \Big)\1_{\{\tau(x_0)=s\}\cap \tilde{\cE}_{2}}\Bigg],
\end{align*}
where the last inequality holds since $\Lambda^{(\tilde{T})}_{5,\tau(g_1(y_1)),r_{g_1\cdot y_1}}\leq \Big(\attn^{(\tilde{T})}_{\ans,1\to\pred,2}-\attn^{(\tilde{T})}_{\ans,1\to\ans,1}\Big)B\leq O(\epsilon \log d)$. 

Since $\E\Big[\Delta^{2,2}\mid \tau(x_1)=s\Big]\geq \Omega\big(\frac{(d^{1.01}\epsilon)^{\frac{1}{q-1}}}{\log d}\big)$ 
we have
 $$\eqref{eq-s23-de-gap-1}\geq d^{0.01} \cdot  \bigg| \Big[-\nabla_{\Q^{(\tilde{T})}_{4,3}}\Loss_{5}^{2,2}\Big]_{s,s}\bigg|, 
$$
and thus $\eqref{eq-s23-de-gap-1}\gg \eqref{eq-s23-de-gap-2}$. 
Thus, for $\epsilon \ll \frac{1}{d}$, 
if the attention gap does not decrease to the level of $O\bigg(\frac{(d^{1.01}\epsilon)^{\frac{1}{q-1}}}{\log d}\bigg)$, $[\Q_{4,4}]_{s,s}$ will start to dominantly increase while $[\Q_{4,3}]_{s,s}$ will not change too much. On the other hand, if the gap of attention holds, then $[\Q_{4,3}]_{s,s}\geq [\Q_{4,4}]_{s,s}$,  we have 
\begin{align*}
   \ate^{2,2}= 1-\attn^{(t)}_{\ans,1\to\pred,2}-\attn^{(t)}_{\ans,1\to\ans,1}&=\frac{O(1)}{O(1)+e^{[\Qb^{(t)}_{4,4}]_{s,s}}+e^{[\Qb^{(t)}_{4,3}]_{s,s}}}\\
    &\geq \frac{O(1)}{O(1)+2e^{[\Qb^{(\tilde{T})}_{4,3}]_{s,s}}}\geq \frac{1}{2}\cdot 3\epsilon>\epsilon. 
\end{align*}
This implies that, we can find some time between $\tilde{T}$ and $T_{2,3,s}$, s.t., the gap will decrease to $O\bigg(\frac{(d^{1.01}\epsilon)^{\frac{1}{q-1}}}{\log d}\bigg)$. We denote this time as $T^{\star}$ and stop the training.


\subsubsection{At the End of the Training}
Putting everything together, we have that at the end of the training, we have 
\begin{lemma}\label{lem-s23-end}
   Given $s\in\tau(\X)$, at $T^{\star}=\tilde{O}\Big(\frac{d^{(1-2\epsilon)C_B}}{\eta \epsilon }\Big)$, if $\tilde{\Omega}(\sigma_0)\ll \epsilon\leq O(\frac{1}{d^{1.01}})$, we have
    \begin{enumerate}[(a)]
        \item Attention convergence: $\ate^{2,\ell}\leq O(\epsilon)$ for $\ell\in[2]$, and $\E\big[\Delta^{2,2}\mid \tau(x_1)=s\big]\leq O(\frac{(d^{1.01}\epsilon)^{\frac{1}{q-1}}}{\log d})$;
        \item $\Big[\Qb^{(T_{2,3,s})}_{4,p}\Big]_{s,s'}\geq \Omega(\log d)$ for $p\in \{3,4\}$ if $s=s'\in\tau(\X)$, otherwise, $\Big[\Qb^{(T_{2,3,s})}_{4,p}\Big]_{s,s'}\leq \tilde{O}(\frac{1}{d})$;
        \item Loss convergence: $\sum_{\ell=1}^2\Loss^{2,\ell}_{5}\leq \frac{1}{\poly d}$.
    \end{enumerate}
\end{lemma}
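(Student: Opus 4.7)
}
The plan is to assemble the three sub-stage lemmas (Lemma~\ref{lem-end-s21-cyc}, Lemma~\ref{lem-end-s22-cyc}, and the gradient analysis of Stage 2.3) into the claimed end-of-training guarantee. First I will fix $s\in\tau(\X)$ and track the pair $([\Qb_{4,3}^{(t)}]_{s,s},[\Qb_{4,4}^{(t)}]_{s,s})$. Stage 2.1 establishes that an $\Omega(1/\log d)$ gap in favor of $\Qb_{4,3}$ has already emerged at $T_{2,1,s}$; Stage 2.2 lifts both diagonal entries into the linear regime ($\geq\Omega(1)$) while keeping the gap inside $[\Omega(1/\log d), O(1)]$ so that attention is already constant-concentrated but not yet sharp. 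With these as starting conditions, the whole content of the final lemma is produced by the Stage 2.3 dynamics.

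For part (a), the key is the one-step growth bound
\[
\sum_{\ell=1}^{2}\bigl[-\nabla_{\Qb^{(t)}_{4,4}}\Loss_{5}^{2,\ell}\bigr]_{s,s}
\;\geq\;\Omega\!\left(\frac{\epsilon\log d}{d^{(1-2\epsilon)C_{B}}}\right),
\]
established in Lemma~\ref{lem-s23-gd1-cyc}. Iterating this over $T^{\star}=\tilde{O}(d^{(1-2\epsilon)C_{B}}/(\eta\epsilon))$ steps drives $\E[\attn_{\ans,\ell-1\to\pred,\ell}+\attn_{\ans,\ell-1\to\ans,\ell-1}\mid\tau(x_{\ell-1})=s]$ past $1-\epsilon$, which yields $\E[\epsilon^{2,\ell}_{\mathsf{attn}}\mid\tau(x_{1})=s]\leq O(\epsilon)$ for both $\ell=1,2$ (using that attention mass on the one relevant clause at $\ell=1$ is at least that on two clauses at $\ell=2$). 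The attention-gap part then uses the counter-argument in the ``Decreasing Gap'' block: whenever $\E[\Delta^{2,2}\mid\tau(x_{1})=s]\gg (d^{1.01}\epsilon)^{1/(q-1)}/\log d$, the term \eqref{eq-s23-de-gap-1} dominates \eqref{eq-s23-de-gap-2} by a factor $d^{0.01}$, forcing $[\Qb_{4,4}]_{s,s}$ to grow while $[\Qb_{4,3}]_{s,s}$ is essentially frozen, which strictly shrinks the gap. A stopping-time argument then guarantees that the gap must fall below the claimed threshold before $\ate^{2,\ell}$ escapes $O(\epsilon)$.

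For parts (b) and (c), I will combine the monotonicity and non-overlap of the off-diagonal gradient with the diagonal gradient (Lemma~\ref{lem-s22-gd5-cyc}, Lemma~\ref{lem-s23-gd1-cyc}), which gives $|[\Qb_{4,p}^{(t)}]_{s,s'}|\leq O([\Qb_{4,p}^{(t)}]_{s,s}/d)$ throughout. Since by the Stage 2.3 growth rate $[\Qb_{4,p}^{(T^{\star})}]_{s,s}=\Theta(\log d)$ (the upper bound $\tilde{O}(1)$ in Induction~\ref{induction-s23-cyc} sharpens to exactly $\Theta(\log d)$ at convergence because $\attn_{\ans,\ell-1\to\pred,\ell}+\attn_{\ans,\ell-1\to\ans,\ell-1}\geq 1-\epsilon$ forces the diagonal to be of that order), the off-diagonals are $\tilde{O}(1/d)$. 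Part (c) then follows by plugging the attention-concentration bound from (a) into the feature-shape characterization carried over from Stage~1 (Lemma~\ref{lem-prop-psi-cyc}): the activated neuron gives $\Lambda^{(T^{\star})}_{5,\tau(g_{\ell}(y_{\ell-1})),r_{g_{\ell}\cdot y_{\ell-1}}}=B-O(\delta)-O(\epsilon\log d)$ while all incorrect logits are at most $\epsilon\log d+O(\delta)$, so the logit on the correct class is $1-1/\poly(d)$ and the cross-entropy $\Loss_{5}^{2,\ell}\leq 1/\poly(d)$.

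\paragraph{Main obstacle.}
The most delicate step is the interlocking argument in Stage~2.3 that simultaneously drives the concentration error $\ate^{2,\ell}$ below $\epsilon$ \emph{and} keeps the attention gap $\Delta^{2,2}$ below $(d^{1.01}\epsilon)^{1/(q-1)}/\log d$. The two quantities are coupled through the same update on $\Qb_{4,3}$ and $\Qb_{4,4}$: shrinking one tends to inflate the other unless the logit imbalance it produces generates the correct cancellation. The bookkeeping requires a precise estimate of $\logit_{5,j'_{2}}$ in three regimes (Lemmas~\ref{lem-s23-logit-cyc} and~\ref{lem-s23-logit-other-cyc}), together with the $\ReLU^{\prime}$-linearization of $\Lambda^{(t)}_{5,\tau(g_{1}(y_{1})),r_{g_{1}\cdot y_{1}}}$ near the smooth-to-linear transition, which is where the $q$-dependent exponent in the gap bound comes from. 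Handling the remaining $\tilde{\cE}_{2}^{c}$ correction without spoiling these dominances is the technical heart of the stopping-time analysis.
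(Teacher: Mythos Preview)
Your proposal is correct and follows essentially the same approach as the paper. The paper treats Lemma~\ref{lem-s23-end} as an assembly statement (``Putting everything together'') that collects the Stage~2.3 analysis—Lemma~\ref{lem-s23-gd1-cyc} for the growth rate, the off-diagonal gradient lemma, the non-negative gap lemma, the upper bound on $\Qb$, the attention upper bound, and the ``Decreasing Gap at the End of Convergence'' block—and you have identified exactly these ingredients and the order in which they interlock, including the stopping-time argument comparing \eqref{eq-s23-de-gap-1} against \eqref{eq-s23-de-gap-2} that produces the $(d^{1.01}\epsilon)^{1/(q-1)}/\log d$ threshold.
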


\subsection{Proof of Main Theorem}

\begin{theorem}[Restatement of \Cref{thm:length-generalization}]\label{thm:length-generalization-re}
    Under Assumptions~\ref{assump:asymptotic-regime}, \ref{assump:lego-data-distribution}, \ref{assump:init}, \ref{assumption:output-bound},  \ref{assump-Q-structure} and \ref{assump:structure-1}, for some constant $0<c^{*}<1$, $n_y<m\ll \log^2 d$, the transformer model \(F^{(T_1 + T_2)}\) obtained by Algorithm~\ref{alg:cot-transitive-training} with learning rate \(\eta = \frac{1}{\poly(d)}\),  and stage 1 and 2 iteration $T_1=\tilde{O}\Big(\frac{d}{\eta (\sigma_0)^{q-2}}\Big)$, $T_2=\tilde{O}\Big(\frac{\poly (d)}{\eta \sigma_0 }\Big)$ satisfies 
        \begin{align}
        \mathrm{Acc}_L\!\Bigl(F^{(T_1 + T_2)}\Bigr) \;\geq\; 1 - \frac{1}{\poly(d)}, \text{ for every \(L \leq O(d^{c^{*}})\), }
    \end{align}
i.e., \(F^{(T_1 + T_2)}\), which is trained for task \(\cT^1\) and \(\cT^2\), generalizes to solve the tasks \(\cT^{\ell}, \ell \leq L\). 
    \end{theorem}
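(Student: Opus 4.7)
}
My plan is to combine the end-of-training guarantees from Stage~1 (the FFN result \Cref{thm:mlp-simply-transitive}) with the end-of-training guarantees from Stage~2 (the attention result \Cref{lem-s23-end}), and then show that both guarantees \emph{transfer to longer sequences} through a dilution calculation on the softmax. Concretely, the FFN has been trained so that, whenever its input is close to a convex combination dominated by $\tfrac{1}{2}\Zb_{\pred,\ell}+\tfrac{1}{2}\Zb_{\ans,\ell-1}$, it outputs the correct next value $y_\ell=g_\ell(y_{\ell-1})$ with logit mass $1-1/\poly(d)$ on $\tau(y_\ell)$ (this follows from the feature shape at convergence in \Cref{thm:mlp-simply-transitive} together with the logit bounds used in \Cref{lem-s23-end}). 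The Stage~2 analysis gives diagonal entries $[\Qb_{4,3}^{(T_1+T_2)}]_{s,s},[\Qb_{4,4}^{(T_1+T_2)}]_{s,s}=C\log d\pm o(1)$ for $s\in\tau(\cX)$ with some constant $C>0$, and off-diagonal entries bounded by $\tilde{O}(1/d)$, along with an attention-balance control $\Delta^{2,2}=o(1)$.

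With these in hand, the first main step is to compute the attention weights for an arbitrary $\Zb^{L,L'-1}$ with $L\le d^{c^{*}}$ and $0\le L'\le L$. Under \Cref{assump-Q-structure}, the only non-trivial scores are $\Zb_{\ans,L'-1}^\top\Qb\Zb_{\pred,\ell'}=[\Qb_{4,3}]_{\tau(x_{L'-1}),\tau(x_{\ell'-1})}$ and $\Zb_{\ans,L'-1}^\top\Qb\Zb_{\ans,\ell'-1}=[\Qb_{4,4}]_{\tau(x_{L'-1}),\tau(x_{\ell'-1})}$. Because variables are sampled without replacement (\Cref{assump:lego-data-distribution}), only the clauses $\Zb_{\pred,L'}$ and $\Zb_{\ans,L'-1}$ contribute the diagonal scalar $C\log d$; all other $L+L'-1$ clauses contribute an off-diagonal score $\tilde{O}(1/d)$. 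A direct softmax computation then gives
\begin{equation*}
  \epsilon^{L,L'}_{\mathsf{attn}}
  \;\le\;\frac{(L+L'-1)\,e^{\tilde O(1/d)}}{2\,e^{C\log d}+(L+L'-1)\,e^{\tilde O(1/d)}}
  \;=\;O\!\Bigl(\frac{L}{d^{C}}\Bigr)
  \;=\;O\!\bigl(d^{-(C-c^{*})}\bigr),
\end{equation*}
and an analogous estimate transfers the Stage~2 gap bound $\Delta^{2,2}=o(1)$ into $\Delta^{L,L'}=o(1)+O(L/d^{C})$ by pairing predicate and answer summands with identical diagonal factors. Choosing $c^{*}<C$ makes both quantities $1/\poly(d)$.

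The second step plugs these attention estimates into the FFN analysis. Writing $\Lambda_{5,j,r}^{(T_1+T_2)}(\Zb^{L,L'-1})$ as in \Cref{lem-lambda-char} but for general $L,L'$, the $L+L'-1$ dilution terms each contribute at most $\tilde O(L/d^{C})$ to every pre-activation, which is below the smoothing parameter $\varrho$. Therefore the activation pattern, and hence the output logits, coincide (up to $1/\poly(d)$ error) with what the FFN produces on the clean two-clause input $\tfrac{1}{2}\Zb_{\pred,L'}+\tfrac{1}{2}\Zb_{\ans,L'-1}$; by \Cref{thm:mlp-simply-transitive} the correct answer is predicted with probability $1-1/\poly(d)$. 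Averaging over $L'$ and applying a union bound over the (polynomially many) steps in the definition \eqref{eq:acc-L-def} yields $\mathrm{Acc}_L(F^{(T_1+T_2)})\ge 1-1/\poly(d)$. The main obstacle I anticipate is not the dilution per se, but controlling the attention \emph{balance} $\Delta^{L,L'}$ simultaneously with the concentration $\epsilon^{L,L'}_{\mathsf{attn}}$: imbalance between $\Qb_{4,3}$ and $\Qb_{4,4}$ at the $s,s$ diagonal scaled up by $L$ could, in principle, shift enough mass from $\Zb_{\pred,L'}$ to $\Zb_{\ans,L'-1}$ to wash out the group-element signal and trigger a spurious logit on $\tau(g_{L'}(y_{L'-2}))$-type competitors, mirroring the ``Stage~2.3'' failure mode. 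The argument therefore must use the strong balance $\Delta^{2,2}=o(d^{-c^{*}})$ at the end of Stage~2 (obtainable from \Cref{lem-s23-end} with $\epsilon=d^{-1-c'}$ for a suitable $c'$) to ensure the induced $\Delta^{L,L'}$ remains $o(1)$, which is what the FFN-side robustness actually requires.
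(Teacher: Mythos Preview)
Your proposal is correct and follows essentially the same route as the paper: invoke \Cref{lem-s23-end} for the trained $\Qb$-diagonals, do a softmax dilution calculation to bound $\epsilon^{L,\ell}_{\mathsf{attn}}=O(L/d^{C})$, control the attention gap $\Delta^{L,\ell}$, and then read off the correct-class logit from the FFN feature shape of \Cref{thm:mlp-simply-transitive}.

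The one place you over-complicate things is the balance argument. The numerator of $\Delta^{L,\ell}$, namely $\bigl|e^{[\Qb_{4,3}]_{s,s}}-e^{[\Qb_{4,4}]_{s,s}}\bigr|$, is a fixed property of the trained model, while the softmax denominator only \emph{grows} as more irrelevant clauses are added. Hence $\Delta^{L,\ell}\le\Delta^{2,2}$ directly, and the Stage-2 guarantee $\Delta^{2,2}=o(1)$ already gives $\Delta^{L,\ell}=o(1)$ for every $L$; the paper uses exactly this monotonicity (it writes $\Delta^{L,\ell}\le\Delta^{2,1}$). Your proposed strengthening to $\Delta^{2,2}=o(d^{-c^*})$ is therefore unnecessary, and the ``Stage-2.3 failure mode'' you worry about cannot re-emerge at longer lengths: the $L$ irrelevant clauses dilute $\epsilon^{L,\ell}_{\mathsf{attn}}$ but can only \emph{improve} the balance between the two target clauses.
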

\begin{proof}
    By \Cref{lem-s23-end}, at the end of Stage 2 training, we have $\Big[\Qb^{(T_{2,3,s})}_{4,p}\Big]_{s,s} \geq \Omega(\log d)$ for all $p \in \{3,4\}$ and $s \in \tau(\X)$. 

Therefore, for task $\cT^L$ with $L \leq O(d^{c^*})$, where $0<c^{*}<1$ is a constant depending on the value of $\epsilon$ in \Cref{lem-s23-end},  we obtain
\[
\ea^{L,\ell} \leq \frac{O(1) \cdot L}{O(1) \cdot L + e^{\Big[\Qb^{(T_{2,3,s})}_{4,3}\Big]_{s,s}} + e^{\Big[\Qb^{(T_{2,3,s})}_{4,4}\Big]_{s,s}}} = o(1).
\]
Moreover, we have 
\[
\Delta^{L,\ell} \leq \Delta^{2,1} = \attn^{(T_{2,3,s})}_{\ans,1 \to \pred,2} - \attn^{(T_{2,3,s})}_{\ans,1 \to \ans,1} \leq o(1).
\]

These together guarantee that
\[
1 - \logit_{5, \tau(g_{\ell+1}(y_\ell))}(F^{(T^\star)}, \Zb^{(L,\ell)}) 
\leq \frac{O(1) \cdot d + e^{o(1)}}{O(1) \cdot d + e^{o(1)} + e^{\Omega(\log d)}} 
\leq \frac{1}{\poly(d)},
\]
which implies 
\[
\mathrm{Acc}_L\big(F^{(T^\star)}\big) \geq 1 - \frac{1}{\poly(d)}.
\]
\end{proof}
\section{Learning the Attention Layer: Symmetry Case for Short-Length}
In this part, we consider the scenario where the group operations form a symmetry group. Specifically, following \Cref{assump:structure-2}, we assume that $\cY = \{0, 1, \dots, n_y - 1\}$, and let $\cG$ be the symmetry group for $\cY$, with order $|\cG| = n_y! =\Theta(\polylog d)\gg \frac{1}{\varrho}$, where $n_y = \Theta\left(\frac{\log \log d}{\log\log  \log d}\right)$. Similar to the simply transitive case, we restrict our analysis to updating only $\Qb$, specifically the blocks $\Qb_{4,3}$ and $\Qb_{4,4}$.

Throughout this section, we focus on the simple task $\cT^2$ and analyze gradient descent updates with respect to the per-token loss $\Loss_{5}^{2,2}$. Given $s \in \tau(\X)$,  let $\Loss^{2,2}_{5,s} = -\E\Big[\log p_{F}(\Z_{\ans,2,5} \mid \Z^{2,1}) \,\big|\, \tau(x_1) = s\Big]$. Due to the symmetry of $[\Qb_{4,3}]_{s,s}$ and $[\Qb_{4,4}]_{s,s}$ across $s\in\tau(\X)$, we may, without loss of generality, focus on a particular $s \in \tau(\X)$ and analyze the corresponding loss $\Loss^{2,2}_{5,s}$ in what follows.

\subsection{Gradient Computations}
We start with the gradient computations for the attention layer. 

\paragraph{Notations for gradient expressions.} We first introduce some notations for the gradients of the attention layer. For $1 \leq \ell\leq L$, given $\Zb^{L,\ell-1}$ and $\kk\in \mathcal{I}^{L, \ell-1}$, define   
   \begin{align}
    & \Xi^{L}_{\ell, i,\kk}(\Zb^{L,\ell-1})\triangleq \sum_{j \in[d]} \Ecal_{i,j}(\Zb^{L,\ell-1}) \sum_{r\in [m]}\ReLU^{\prime}\big(\Lambda_{i, j,r}(\Zb^{L,\ell-1})\big)\dbrack{\Wb_{i,j,r},\Z_{\kk}}, \quad { i\in [5].}\label{eq-def-xi-sym}
    \end{align}
For simplicity of notation, we will henceforth omit the dependence on $\Zb^{L,\ell-1}$ in the notation of $\Xi^{L}_{\ell, i,\kk}$ 
when it is clear from the context.

\begin{fact}[Gradients of \(\Q\)]\label{fact:gradients-Q-sym}
    For any \(p,q \in [5]\), 
    we have 
    \begin{align*}
        &-\nabla_{\Q_{p,q}}\Loss^{L}  = \sum_{\ell=1}^{L}\sum_{i\in [5]} -\nabla_{\Q_{p,q}}\Loss^{L,\ell}_{i}, \quad  \text{ where }\\
      & -\nabla_{\Q_{p,q}}\Loss^{L,\ell}_{i}=\\
       & ~~~~~~~\E\Bigg[\sum_{\mathbf{k} \in \mathcal{I}^{L, \ell-1}}\attn_{{\ans,\ell-1} \rightarrow \kk} \cdot\left(\Xi^{L}_{\ell, i,\kk} - \sum_{\mathbf{k}^{\prime} \in \mathcal{I}^{L, \ell-1}}\attn_{{\ans,\ell-1} \rightarrow \kk^{\prime}}\Xi^L_{\ell, i,\kk^{\prime}}\right)\Z_{\ans,\ell-1,p}\Z_{\kk,q}^{\top} \Bigg].
    \end{align*}
\end{fact}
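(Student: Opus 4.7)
The plan is to derive this identity by a direct chain-rule calculation, exploiting the block decomposition of $\Qb$ together with the standard softmax derivative. First I would invoke linearity to reduce to a single $(\ell,i)$-summand in $\Loss^L = \sum_{\ell,i}\Loss^{L,\ell}_i$. Within each summand, $\Qb_{p,q}$ enters the loss only through the attention output $A_{\ell-1} := \sum_{\kk\in\mathcal{I}^{L,\ell-1}}\attn_{\ans,\ell-1\to\kk}\Zb_{\kk}$, which is then mapped to the logits by the frozen FFN. The chain rule yields
\[
-\nabla_{\Qb_{p,q}}\Loss^{L,\ell}_i
= \E\Bigl[\,\sum_{j\in[d]}\Ecal_{i,j}\sum_{r\in[m]}\ReLU'(\Lambda_{i,j,r})\,\bigl\langle \Wb_{i,j,r},\,\nabla_{\Qb_{p,q}}A_{\ell-1}\bigr\rangle\Bigr],
\]
where $\Ecal_{i,j}=\1\{Z_{\ans,\ell,i}=e_j\}-\logit_{i,j}$ is the (negated) softmax--cross-entropy derivative. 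The inner factor along any direction $v$, namely $\sum_j\Ecal_{i,j}\sum_r\ReLU'(\Lambda_{i,j,r})\langle \Wb_{i,j,r},v\rangle$, evaluated at $v=\Zb_{\kk}$, is exactly the definition of $\Xi^L_{\ell,i,\kk}$ in~\eqref{eq-def-xi-sym}; this is the bookkeeping observation that collapses three levels of chain rule into a single symbol.

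Next I would differentiate the attention weights. The pre-softmax score from the query to $\kk$ is $\Zb_{\ans,\ell-1}^{\top}\Qb\Zb_{\kk}=\sum_{p',q'}\Zb_{\ans,\ell-1,p'}^{\top}\Qb_{p',q'}\Zb_{\kk,q'}$, whose derivative with respect to $\Qb_{p,q}$ is the outer product $\Zb_{\ans,\ell-1,p}\Zb_{\kk,q}^{\top}$. The standard softmax identity then gives
\[
\nabla_{\Qb_{p,q}}\attn_{\ans,\ell-1\to\kk}
=\attn_{\ans,\ell-1\to\kk}\Bigl(\Zb_{\ans,\ell-1,p}\Zb_{\kk,q}^{\top}-\sum_{\kk'\in\mathcal{I}^{L,\ell-1}}\attn_{\ans,\ell-1\to\kk'}\Zb_{\ans,\ell-1,p}\Zb_{\kk',q}^{\top}\Bigr).
\]
Substituting into $\nabla_{\Qb_{p,q}}A_{\ell-1}=\sum_{\kk}(\nabla_{\Qb_{p,q}}\attn_{\ans,\ell-1\to\kk})\Zb_{\kk}$, contracting with the FFN/loss direction above, and pulling the outer product $\Zb_{\ans,\ell-1,p}\Zb_{\kk,q}^{\top}$ out of the $(j,r)$ sums produces the claimed centered expression; the signature $\Xi-\sum_{\kk'}\attn\cdot\Xi$ is precisely the mean-subtraction inherited from the softmax derivative.

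No step is conceptually difficult, so the main obstacle is purely organizational: keeping track of three nested chain-rule levels (loss $\to$ logits $\to$ attention output $\to$ softmax weights $\to$ $\Qb_{p,q}$) while respecting the causal mask that restricts $\kk$ to $\mathcal{I}^{L,\ell-1}$, and packaging the FFN-side contribution into the single symbol $\Xi^L_{\ell,i,\kk}$. Because $\Wb$ is held fixed during attention training and the FFN is position-wise, there are no cross-token interactions to manage, and summing the per-$(\ell,i)$ identity recovers $-\nabla_{\Qb_{p,q}}\Loss^L=\sum_{\ell,i}-\nabla_{\Qb_{p,q}}\Loss^{L,\ell}_i$ without further complication.
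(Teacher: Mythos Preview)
Your proposal is correct and is precisely the standard chain-rule derivation the paper has in mind; the paper states this as a ``Fact'' without proof, treating the softmax-derivative computation you spell out as routine.
The organization you describe---packaging the FFN-side contribution into $\Xi^L_{\ell,i,\kk}$ and then applying the centered softmax identity---matches exactly how the paper uses the result downstream (e.g., in \Cref{lem-gradients-Q43-sym,lem-gradients-Q44-sym}).
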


\begin{lemma}[Gradients of $\Qb_{4,3}$]\label{lem-gradients-Q43-sym}
    Given $s\in\tau(\X)$, for the diagonal entry \( [\Q_{4,3}]_{s,s} \) of the block \(\Qb_{4,3}\), we have 
    \begin{align*}
    \Big[-\nabla_{\Q_{4,3}}\Loss^{2,2}_{5}\Big]_{s,s}&= \E\Bigg[
    \attn_{{\ans,1} \rightarrow \pred,2} \cdot \bigg(\sum_{j \in[d]} \Ecal_{5,j}(\Zb^{2,1})\sum_{r\in [m]}\ReLU^{\prime}\big(\Lambda_{5, j,r}\big)\cdot  \\
    &~~~~~~~~~~~~~\Big( \dbrack{\Wb_{5,j,r},\Z_{\pred,2}}- \Lambda_{5, j,r}+b_{5,j,r}\Big)\bigg) \1_{s=\tau(x_1)}\Bigg].
\end{align*}
Moreover, for the off-diagonal entries \( [\Q_{4,3}]_{s,s'} \) with \( s \neq s' \), we have
    \begin{align*}
    \Big[-\nabla_{\Q_{4,3}}\Loss^{2,2}_{5}\Big]_{s,s'}&= \E\Bigg[
    \attn_{{\ans,1} \rightarrow \pred,1} \cdot \bigg(\sum_{j \in[d]} \Ecal_{5,j}(\Zb^{2,1})\sum_{r\in [m]}\ReLU^{\prime}\big(\Lambda_{5, j,r}\big)\cdot  \\
    &~~~~~\Big( \dbrack{\Wb_{5,j,r},\Z_{\pred,1}}- \Lambda_{5, j,r}+b_{5,j,r}\Big)\bigg) \1_{s=\tau(x_1),s'=\tau(x_0)}\Bigg].
\end{align*}
\end{lemma}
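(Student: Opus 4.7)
The plan is to derive this lemma by specializing the general gradient identity in \Cref{fact:gradients-Q-sym} to the block $(p,q)=(4,3)$ and exploiting the positional sparsity of the clause-level tokenization in \Cref{def:lego-encoding}. The argument mirrors the simply transitive derivation in \Cref{lem-gradients-Q43}; the group structure on $\cY$ plays no role here, since the gradient formula is purely a consequence of the architecture and the clause layout.

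First I would write out \Cref{fact:gradients-Q-sym} with $i=5$, $L=2$, $\ell=2$ and $(p,q)=(4,3)$, obtaining
\[
-\nabla_{\Qb_{4,3}}\Loss^{2,2}_{5}
= \E\!\Bigg[\sum_{\kk\in \cI^{2,1}}\attn_{\ans,1\to \kk}\Big(\Xi^{2}_{2,5,\kk}-\!\sum_{\kk'\in\cI^{2,1}}\attn_{\ans,1\to\kk'}\Xi^{2}_{2,5,\kk'}\Big)\Zb_{\ans,1,4}\Zb_{\kk,3}^{\top}\Bigg].
\]
Reading off the $(s,s')$ entry requires understanding the outer product $\Zb_{\ans,1,4}\Zb_{\kk,3}^{\top}$. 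By \Cref{def:lego-encoding}, the fourth slot of the answer clause $\Zb_{\ans,1}$ holds $e_{x_1}$, so only terms with $\tau(x_1)=s$ survive. For the third slot of the key clause, predicate clauses $\Zb_{\pred,\ell'}$ carry $e_{x_{\ell'-1}}$ while answer clauses carry the blank token $\mathbf{0}_d$ and thus contribute zero. Consequently only $\kk\in\{(\pred,1),(\pred,2)\}$ contribute: for the diagonal $(s,s)$ we need $\tau(x_{\ell'-1})=s=\tau(x_1)$, forcing $\kk=(\pred,2)$; for the off-diagonal $(s,s')$ with $s\ne s'$, we need the third-slot variable to differ from $x_1$, which forces $\kk=(\pred,1)$ together with $\tau(x_0)=s'$.

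Next I would fold the ``attention-weighted subtraction'' back into the pre-activation. By the definition of $\Lambda_{5,j,r}$ in \eqref{eq-def-Lambda-icl},
\[
\sum_{\kk'\in\cI^{2,1}}\attn_{\ans,1\to\kk'}\dbrack{\Wb_{5,j,r},\Zb_{\kk'}}
=\Lambda_{5,j,r}-b_{5,j,r},
\]
so substituting the definition of $\Xi^{2}_{2,5,\kk}$ from \eqref{eq-def-xi-sym} and rearranging yields the bracket $\dbrack{\Wb_{5,j,r},\Zb_{\pred,2}}-\Lambda_{5,j,r}+b_{5,j,r}$ at the diagonal (and the same bracket with $\Zb_{\pred,1}$ for the off-diagonal). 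Collecting the $\1_{\tau(x_1)=s}$ (or $\1_{\tau(x_1)=s,\tau(x_0)=s'}$) indicators and the surviving attention weight $\attn_{\ans,1\to\pred,2}$ (respectively $\attn_{\ans,1\to\pred,1}$) gives exactly the two formulas claimed.

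There is no substantive obstacle: the proof is a careful bookkeeping exercise that pivots on two structural facts, the clause-layout sparsity ($\Zb_{\ans,\cdot,3}=\mathbf{0}$ and $\Zb_{\pred,\ell',3}=e_{x_{\ell'-1}}$) and the algebraic identity connecting the attention-weighted inner products to $\Lambda_{5,j,r}-b_{5,j,r}$. The only place to be careful is the $+b_{5,j,r}$ term that reappears when we invert the $\Lambda$ expansion; otherwise the derivation is symbol-pushing identical to \Cref{lem-gradients-Q43}, which is why the author can and should ``omit the details here'' for the $\Qb_{4,4}$ counterpart.
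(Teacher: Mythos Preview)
Your proposal is correct and follows exactly the approach the paper uses for the analogous result \Cref{lem-gradients-Q43} in the simply transitive case: specialize \Cref{fact:gradients-Q-sym} to $(p,q)=(4,3)$, $\ell=2$, use the clause-layout sparsity to isolate which $\kk$ contributes to each $(s,s')$ entry, and fold the attention-weighted sum back into $\Lambda_{5,j,r}-b_{5,j,r}$. The paper states this symmetry-case lemma without proof precisely because the derivation is identical and independent of the group structure, which you correctly note.
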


\begin{lemma}[Gradients of $\Qb_{4,4}$]\label{lem-gradients-Q44-sym}
    Given $s\in\tau(\X)$, for the diagonal entry \( [\Q_{4,4}]_{s,s} \) of the block \(\Qb_{4,4}\), we have 
    \begin{align*}
    \Big[-\nabla_{\Q_{4,4}}\Loss^{2,2}_{5}\Big]_{s,s}&= \E\Bigg[
    \attn_{{\ans,1} \rightarrow \ans,1} \cdot \bigg(\sum_{j \in[d]} \Ecal_{5,j}(\Zb^{2,1})\sum_{r\in [m]}\ReLU^{\prime}\big(\Lambda_{5, j,r}\big)\cdot  \\
    &~~~~~~~~~~~~~\Big( \dbrack{\Wb_{5,j,r},\Z_{\ans,1}}- \Lambda_{5, j,r}+b_{5,j,r}\Big)\bigg) \1_{s=\tau(x_1)}\Bigg].
\end{align*}
Moreover, for the off-diagonal entries \( [\Q_{4,4}]_{s,s'} \) with \( s \neq s' \), we have 
    \begin{align*}
    \Big[-\nabla_{\Q_{4,4}}\Loss^{2,2}_{5}\Big]_{s,s'}&= \E\Bigg[
    \attn_{{\ans,1} \rightarrow \ans,0} \cdot \bigg(\sum_{j \in[d]} \Ecal_{5,j}(\Zb^{2,1})\sum_{r\in [m]}\ReLU^{\prime}\big(\Lambda_{5, j,r}\big)\cdot  \\
    &~~~~~\Big( \dbrack{\Wb_{5,j,r},\Z_{\ans,0}}- \Lambda_{5, j,r}+b_{5,j,r}\Big)\bigg) \1_{s=\tau(x_1),s'=\tau(x_0)}\Bigg].
\end{align*}
\end{lemma}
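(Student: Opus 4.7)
The plan is to prove this lemma by direct computation, specializing the general gradient formula in \Cref{fact:gradients-Q-sym} to the block \((p,q)=(4,4)\) and to \(\ell=2\), then exploiting the token-type structure of the LEGO encoding to collapse the sum over \(\kk\in\cI^{2,1}\) to at most one relevant clause. The proof will closely mirror the already-worked-out analogues \Cref{lem-gradients-Q44} in the simply transitive case and \Cref{lem-gradients-Q43-sym} for the same task here, so the work is primarily bookkeeping rather than new analysis.

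First I would write down
\[
\Big[-\nabla_{\Q_{4,4}}\Loss^{2,2}_{5}\Big]_{s,s'} = \E\Big[\sum_{\kk\in\cI^{2,1}}\attn_{\ans,1\to\kk}\big(\Xi^{2}_{2,5,\kk}-\sum_{\kk'}\attn_{\ans,1\to\kk'}\Xi^{2}_{2,5,\kk'}\big)\,\vbrack{\Zb_{\ans,1,4},e_s}\vbrack{\Zb_{\kk,4},e_{s'}}\Big].
\]
By \Cref{def:lego-encoding}, the fourth coordinate of a predicate clause is the blank token \(e_{\blank}=\mathbf{0}\), so only answer-clause keys \(\kk\in\{(\ans,0),(\ans,1)\}\) survive, and on these keys \(\Zb_{\kk,4}=e_{x_{\ell-1}}\). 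Combined with \(\Zb_{\ans,1,4}=e_{x_1}\), the two scalar inner products force the indicators \(\1_{\tau(x_1)=s}\) (outer) and \(\1_{\tau(x_{\ell-1})=s'}\) (inner), where \(x_{\ell-1}\) is the variable attached to the surviving answer clause.

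Next I would separate the diagonal and off-diagonal cases. For \(s=s'\): the contribution from \(\kk=(\ans,0)\) demands \(\tau(x_1)=\tau(x_0)=s\), which is impossible under the without-replacement sampling in \Cref{assump:lego-data-distribution}; hence only the self-attention term \(\kk=(\ans,1)\) contributes. For \(s\neq s'\): the term \(\kk=(\ans,1)\) requires \(\tau(x_1)=s=s'\), which is excluded; so only \(\kk=(\ans,0)\) contributes with the indicator \(\1_{\tau(x_1)=s,\,\tau(x_0)=s'}\). After isolating the single surviving \(\kk\), I would apply the standard centering identity
\(\Xi^{2}_{2,5,\kk}-\sum_{\kk'}\attn_{\ans,1\to\kk'}\Xi^{2}_{2,5,\kk'}\) equals, upon substituting the definitions of \(\Xi\) and \(\Lambda\), \(\sum_j\Ecal_{5,j}\sum_r\ReLU'(\Lambda_{5,j,r})\big(\vbrack{\Wb_{5,j,r},\Zb_\kk}-\Lambda_{5,j,r}+b_{5,j,r}\big)\), which is exactly the manipulation used in the proof of \Cref{lem-gradients-Q43} and requires no new ideas here.

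I do not expect a genuine obstacle: every step is either the same bookkeeping as in the simply transitive proof or is forced by the token-type sparsity of the clause embeddings and the without-replacement sampling. The only care needed is to verify that the centering identity is applied with the attention weights specifically attached to \((\ans,\ell-1)\to\kk\) so that the additive \(\Lambda_{5,j,r}\) term comes out unchanged from the simply transitive analysis—a one-line observation since \(\Lambda\) was defined in \eqref{eq-def-Lambda-icl} using exactly these attention weights.
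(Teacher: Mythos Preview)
Your proposal is correct and follows essentially the same approach as the paper. The paper does not give an explicit proof of this lemma (it is stated without proof, with the understanding that the argument is identical to that of \Cref{lem-gradients-Q43} in the simply transitive section), and your plan reproduces that template exactly: specialize \Cref{fact:gradients-Q-sym} to $(p,q)=(4,4)$, use the blank fourth coordinate of predicate clauses to kill those keys, use without-replacement sampling of variables to isolate the single surviving answer clause in each case, and apply the identity $\sum_{\kk'}\attn_{\ans,1\to\kk'}\langle\Wb_{5,j,r},\Zb_{\kk'}\rangle=\Lambda_{5,j,r}-b_{5,j,r}$.
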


\subsection{Some Useful Bounds for Gradients}\label{sec:useful-bounds-for-gradients-sym}
In this subsection, we establish several useful bounds on the gradients of the attention layer, leveraging
the feature structure of the MLP layer learned during stage 1.1. These bounds will be instrumental
for the subsequent analysis.

Recall that for \(j\in\tau(\cY)\) and \(y \in \cY\),  the fiber \(\fiber_{j,y}\), and  the set of feature combinations for predicting \(y = \tau^{-1}(j)\), are defined as
   \begin{align*}
        \fiber_{j,y} = \{g \in \cG \mid \tau\big(g(y)\big) = j\},\quad 
        \fF_{j} = \Big\{(\fiber_{j,y}, y) \in 2^{\cG} \times \cY\Big\}.
    \end{align*}

As established in \Cref{thm:learning-symmetric-actions}, at the end of stage 1.1,  we have 
    \begin{itemize}
        \item \textbf{Sparse activations}: For \(j\in\tau(\cY)\), let \(\phi = (\fiber_{j,y},y)\in\fF_j\), then there exists exactly one \emph{activated} neuron \(r\in[m]\) such that when \(g_1 =g \in \fiber_{j,y}, y_0 = y\) happens:
        \begin{align*}
            &\Lambda_{5,j,r}^{(T_{1.1})} \geq B - O(\delta), \qquad \Big|C_\alpha V_{j,r}^{(t)}(y) - V_{j,r}^{(t)}(g)\Big| \leq O(\delta)  
             \\
            &\Lambda_{5,j,r'}^{(T_{1.1})}  \leq O(d^{-\Omega(1)}) \quad \forall r'\neq r.
        \end{align*}
        for some \(C_\alpha = \Theta(n_y)\);
        \item \textbf{Cancellation of incorrect features}: For \(j\in\tau(\cY)\), let \(\phi = (\fiber_{j,y},y)\in\fF_j\), and let the \(r \in [m]\) be the activated neuron, then for any \(g' \notin \fiber_{j,y} \), and any \(y'\neq y\in \cY\), we have
        \begin{align*}
            \Big|V_{j,r}^{(T_{1.1})} (g) + V_{j,r}^{(T_{1.1})} (y')\Big| \leq O(\delta) \quad \text{and} \quad \Big|V_{j,r}^{(T_{1.1})} (g') + V_{j,r}^{(T_{1.1})} (y)\Big| \leq O(\delta).
        \end{align*}
       
    \end{itemize}
In the analysis of FFN layer,  we have a pair \(\delta=(\delta_1,\delta_2)\); in the
expressions above, some terms should use \(\delta_1\) and others \(\delta_2\).
\paragraph{Notations for activated neurons.} 
We denote by $r_{j,y}$ the unique activated neuron corresponding to $\phi = (\fiber_{j,y},y) \in \fF_j$. For any $g \in \fiber_{j,y}$, we also write $r_{g\cdot y}$ for the same neuron $r_{j,y}$. Note that $r_{g_1\cdot y} = r_{g_2\cdot y}$ for distinct $g_1, g_2 \in \fiber_{j,y}$. Moreover, define
\begin{align*}
\fA\triangleq\cup_{j\in\tau(\Y)}\fA_{j}, \text{ where } \fA_j\triangleq \{r_{j,y}\mid y\in\Y \}.
\end{align*}
In other words, \( \fA \) is the set of all activated neurons across all feature sets \( \fF_j \) for \( j \in \tau(\Y) \). Given $\Zb^{L,\ell-1}$, letting $\hat{\cG}(\Zb^{L,\ell-1})=\cup\{g_{\ell'}\}_{\ell'=1}^L$ be the collection of all the chosen group elements in the predicate clauses. Similarly $\hat{\cY}=\cup\{y_{\ell'}\}_{\ell'=0}^{\ell-1}$. Then define $\hat{\fA}_{j}(\Zb^{L,\ell-1})= \bigg\{r_{g\cdot y}\mid g\in\fiber_{j,y}\wedge \Big(g\in \hat{\cG}(\Zb^{L,\ell-1})\vee y\in \hat{\cY}\Big) \bigg\}$. For simplicity, we omit the dependence on \( \Zb^{L,\ell-1} \) in the notation of \( \hat{\fA}_j \) when it is clear from the context. Equipped with these notations, we can summarize the above properties in the following lemmas.

\begin{lemma}[Properties of target feature magnitude]\label{lem-prop-psi-sym}
   Given $j\in\tau(\Y)$ and $y\in\cY$ then for  \(g\in\fiber_{j,y}\), the following properties hold.
    \begin{align}
    & V_{j, r_{g \cdot y}}(g) + V_{j, r_{g \cdot y}}(y)  \geq 2B - O(\delta),  \quad \Big|C_\alpha V_{j,r}^{(t)}(y) - V_{j,r}^{(t)}(g)\Big| \leq O(\delta), 
    \label{attn-init-prop-sym-1} \\
    &\left| V_{j, r_{g \cdot y}}(g) + V_{j, r_{g \cdot y}}(y') \right| \leq O(\delta), V_{j, r_{g \cdot y}}(y')<0 \quad \text{for all } y' \neq y, \label{attn-init-prop-sym-2}\\
   & \left| V_{j, r_{g \cdot y}}(g') + V_{j, r_{g \cdot y}}(y) \right| \leq O(\delta), V_{j, r_{g \cdot y}}(g')<0 \quad \text{for all } g' \notin \fiber_{j,y}.\label{attn-init-prop-sym-3}\\
   &\left|V_{j, r}(g)\right|, \left| V_{j, r}(y) \right| \leq O(\delta) \quad \text{for all } r \notin\fA_{j}. \label{attn-init-prop-sym-4}
\end{align}
\end{lemma}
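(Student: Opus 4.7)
The lemma is a direct translation of Theorem~\ref{thm:learning-symmetric-actions} into the attention-analysis notation, so my plan is to read off each of the four claims from the four conclusions of that theorem after identifying the notational dictionary: for $g\in\fiber_{j,y}$, the activated neuron $r_{g\cdot y}$ coincides with the unique $r_{j,y}\in[m]$ identified in the feature shape $\cF_\psi$ of $\psi=(j,r_{j,y},\varphi_{j,y})\in\Sigma^\star$, and $\fA_j=\{r_{j,y}:y\in\cY\}$ is precisely the set of activated neurons for class $j$ produced by the curriculum. Throughout I write $\delta:=\max\{\delta_1,\delta_2\}$ with $\delta_1=d^{c_1}\mu$ and $\delta_2=\tO(\varpi^{1/(q-1)})$, so all $O(\delta)$ bounds from Theorem~\ref{thm:learning-symmetric-actions} can be stated uniformly.

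First I would verify \eqref{attn-init-prop-sym-1}. The bound $V_{j,r_{g\cdot y}}(g)+V_{j,r_{g\cdot y}}(y)\ge 2B-O(\delta)$ follows immediately from conclusion~A of Theorem~\ref{thm:learning-symmetric-actions} applied to the base pair $\phi=(g,y)\in\varphi_{j,y}$, and the approximate identity $|C_\alpha V_{j,r_{g\cdot y}}(y)-V_{j,r_{g\cdot y}}(g)|\le O(\delta)$ is conclusion~C. Together these also pin down the individual magnitudes: $V_{j,r_{g\cdot y}}(g)\approx 2B\cdot\frac{C_\alpha}{1+C_\alpha}=\Theta(B)>0$ and $V_{j,r_{g\cdot y}}(y)\approx\frac{2B}{1+C_\alpha}>0$; these individual signs will be needed for the sign assertions in \eqref{attn-init-prop-sym-2}--\eqref{attn-init-prop-sym-3}.

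Next I would handle \eqref{attn-init-prop-sym-2} and \eqref{attn-init-prop-sym-3} together. For $y'\ne y$, the pair $(g,y')$ lies inside some $\varphi'\in\Sigma^{\dagger,1}_\psi$ because $g\in\fiber_{j,y}$ but $g\notin\fiber_{j,y'}$ (equivalently, $\varphi'$ shares exactly one component with $\varphi_{j,y}$). Conclusion~B of Theorem~\ref{thm:learning-symmetric-actions} then gives $|V_{j,r_{g\cdot y}}(g)+V_{j,r_{g\cdot y}}(y')|\le\tO(\delta_2)=O(\delta)$, and combining with the positivity of $V_{j,r_{g\cdot y}}(g)=\Theta(B)$ established above forces $V_{j,r_{g\cdot y}}(y')<0$. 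The argument for $g'\notin\fiber_{j,y}$ is symmetric: $(g',y)$ belongs to a different fiber, so the pair again lies in $\Sigma^{\dagger,1}_\psi$, yielding $|V_{j,r_{g\cdot y}}(g')+V_{j,r_{g\cdot y}}(y)|\le O(\delta)$, and since $V_{j,r_{g\cdot y}}(y)=\Theta(B/n_y)>0$ we get $V_{j,r_{g\cdot y}}(g')<0$.

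Finally, \eqref{attn-init-prop-sym-4} corresponds to neurons $r\notin\fA_j$, which means $r$ is not the winning neuron for any $(j,y')$ with $y'\in\cY$. Every such $(j,r,\varphi_{j,y'})$ lies in $\Sigma^{\dagger,3}_{\psi'}$ for the corresponding $\psi'=(j,r_{j,y'},\varphi_{j,y'})\in\Sigma^\star$, and conclusion~D of Theorem~\ref{thm:learning-symmetric-actions} gives $\overline{V}_{(j,r,\varphi_{j,y'})}\le\tO(\mu)\le O(\delta)$, which dominates both $|V_{j,r}(g)|$ and $|V_{j,r}(y)|$ since each of these is bounded above by the max over combinations inside some $\varphi_{j,y'}$ containing it. The only subtlety worth flagging — and what I would expect to take the most care — is consistency of the two different $\delta$-scales across conclusions A/C versus B/D: all the $\delta$'s appearing above are absorbed into a single $O(\delta)$ by taking $\delta=\max\{\delta_1,\delta_2\}$ as noted in the remark following Definition~\ref{def:feature-shape-simply}, so no new estimation is required.
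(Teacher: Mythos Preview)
Your overall strategy — reading off each claim from the corresponding conclusion of Theorem~\ref{thm:learning-symmetric-actions} — matches the paper's treatment, which presents this lemma as a direct summary of that theorem without a separate proof. The dictionary you set up and your handling of \eqref{attn-init-prop-sym-1} and \eqref{attn-init-prop-sym-3} are correct.

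However, your argument for \eqref{attn-init-prop-sym-2} contains a concrete error. You claim that for $y'\neq y$ and $g\in\fiber_{j,y}$, the pair $(g,y')$ lies in some $\varphi'\in\Sigma^{\dagger,1}_\psi$. This is false: the unique super-combination containing $(g,y')$ is $\varphi_{j'',y'}$ with $j''=\tau(g(y'))$, and since $g$ is a bijection with $g(y)=\tau^{-1}(j)$, we have $g(y')\neq\tau^{-1}(j)$, so $j''\neq j$. Thus $\varphi_{j'',y'}$ has \emph{both} indices different from $\varphi_{j,y}$, which under the XOR definition of $\Phi^\dagger_\varphi$ in Definition~\ref{def:super-combinations-symmetry} places it in $\Sigma^{\dagger,2}_\psi$, not $\Sigma^{\dagger,1}_\psi$. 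Conclusion~B therefore does not apply directly to $(g,y')$. Your argument for \eqref{attn-init-prop-sym-3} works because $(g',y)$ with $g'\notin\fiber_{j,y}$ genuinely sits in $\varphi_{j',y}$ with $j'\neq j$ and $y'=y$, satisfying the XOR; but the two cases are not symmetric in the way you assert.

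Your treatment of \eqref{attn-init-prop-sym-4} also has a gap: conclusion~D gives only the one-sided bound $\overline{V}_{\psi'}\le\tO(\mu)$ on the \emph{maximum} of combination features $V_{j,r}(\phi')=\tfrac12(V_{j,r}(g')+V_{j,r}(y'))$, not an absolute-value bound on the individual $V_{j,r}(g)$ or $V_{j,r}(y)$. Your claim that the max ``dominates both $|V_{j,r}(g)|$ and $|V_{j,r}(y)|$'' does not follow from the definition of $\overline{V}_\psi$. To get the two-sided bound you would need to invoke the fact that non-winning neurons remain near initialization throughout training (as in the Phase~I competition analysis), not just the upper bound on their combination features.
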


\begin{lemma}[Properties of irrelevant magnitude] \label{lem-prop-irrelavant-sym}
    If \( (p,v) \notin \{2\} \times \cG \cup \{5\} \times \cY \), or \( j \notin \tau(\Y) \), then for any \( r \in [m] \), we have
    \begin{align}
       \big| \langle \Wb_{5,j,r,p}, e_v\rangle\big|\leq \tilde{O}(\sigma_0).
    \end{align}
\end{lemma}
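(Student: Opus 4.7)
The plan is to prove \Cref{lem-prop-irrelavant-sym} by splitting into three cases corresponding to which mechanism keeps the inner product small, and invoking the MLP learning results from \Cref{thm:learning-symmetric-actions} together with the initialization concentration for the delicate cases. The bound $|\langle \Wb^{(0)}_{5,j,r,p},e_v\rangle|\leq O(\sigma_0\sqrt{\log d})=\tilde O(\sigma_0)$ holds uniformly over $j,r,p,v$ with probability $1-o(1)$ by a standard Gaussian tail bound plus a union bound, since $\Wb^{(0)}_{5,j,r,p}\sim\cN(\mathbf{0},\sigma_0^2 I_d)$ and $\sigma_0=d^{-1/2}$. This observation will be the endpoint in every case.

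First, for $p\in\{1,3,4\}$ with any $j,v$, \Cref{assump:no-learning-x} freezes the weights at initialization throughout training, so the bound follows from the initialization bound above; this simultaneously covers $(p,v)$ pairs where $p\in\{1,3,4\}$, regardless of whether $v\in\cG$, $v\in\cY$, or $v\in\cX$. Second, for $p=2$ with $v\notin\cG$, or $p=5$ with $v\notin\cY$, the gradient computation in \Cref{fact:grad-expression-symmetry}(c) gives $\langle-\nabla_{\Wb_{5,j,r,p}}\Loss,e_v\rangle\equiv 0$ exactly, since token position 2 of any predicate clause carries only elements of $\cG$ and token position 5 of any answer clause carries only elements of $\cY$. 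Hence these coordinates also remain at initialization and inherit the $\tilde O(\sigma_0)$ bound.

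The nontrivial case is $(p,v)\in\{2\}\times\cG\cup\{5\}\times\cY$ with $j\notin\tau(\cY)$. Here I would show that such features $\psi=(j,r,\varphi)$ lie outside the learning curriculum $\Sigma^\star$ and hence stay at initialization scale by the competition argument in \Cref{lem:competition-symmetry}. The key point is that when $j\notin\tau(\cY)$, the positive gradient term $\Gamma^{+,(t)}_{\psi}$ of \Cref{def:gamma-notation-symmetry} vanishes identically since no base combination $(g,y)$ can satisfy $\tau(g\cdot y)=j$ when $j\notin\tau(\cY)$. Only the negative gradient $\Gamma^{-,(t)}_{j,r,v}$ can act, and it is suppressed by the logit: the same induction that forces $\overline V^{(t)}_{\psi'}\leq\tilde O(\mu)$ for $\psi'\in\Sigma^{\dagger,3}$ throughout training (\Cref{induction:symmetric-group-actions}(a)-(c)) yields $\Lambda^{(t)}_{5,j,r}\leq\tilde O(\mu)$ for all $j\notin\tau(\cY)$, and therefore $\logit^{(t)}_{5,j}\leq O(1/d)$; combined with $\ReLU'(\Lambda_{5,j,r})\leq\tilde O(\mu^{q-1})$ and $\Pr(\cH_g),\Pr(\cH_y)\leq O(1/\polylog d)$, the per-step update is at most $O(\eta/(d\cdot\polylog d))\cdot\tilde O(\mu^{q-1})$, which accumulated over $\tO(\poly(d)/\eta)$ iterations is $\tilde O(\sigma_0)$.

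The main obstacle is the self-referential nature of the last case: the claimed smallness of $\langle\Wb_{5,j,r,p},e_v\rangle$ for $j\notin\tau(\cY)$ is what guarantees smallness of $\Lambda_{5,j,r}$, which in turn is needed to guarantee smallness of the logit and hence of the gradient that preserves the original bound. Formalizing this requires an induction in tandem with the MLP induction \Cref{induction:symmetric-group-actions}, essentially extending its scope from $\Sigma^{\dagger,3}$ to the full collection $\{(j,r,\varphi):j\notin\tau(\cY)\}$. Once that induction is run simultaneously with the one already proven, the closure of the loop is immediate and the bound $\tilde O(\sigma_0)$ holds for every iteration $t\leq T_1$.
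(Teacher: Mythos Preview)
Your proposal is correct and actually more detailed than what the paper provides. The paper states \Cref{lem-prop-irrelavant-sym} as a direct summary of the MLP training analysis (\Cref{thm:learning-symmetric-actions}) without giving a standalone proof; it simply says these are properties ``established at the end of stage 1.1.'' Your three-case decomposition---frozen slots via \Cref{assump:no-learning-x}, exactly-zero gradients via \Cref{fact:grad-expression-symmetry}(c), and the nontrivial $j\notin\tau(\cY)$ case via logit suppression and induction---is precisely the natural way to fill in what the paper leaves implicit. Your identification of the circularity in case~3 (smallness of the weight $\Rightarrow$ smallness of $\Lambda_{5,j,r}$ $\Rightarrow$ smallness of $\logit_{5,j}$ and $\ReLU'$ $\Rightarrow$ small per-step update $\Rightarrow$ smallness of the weight) and its resolution by a joint induction is exactly right; the paper handles the analogous case for $\Wb_4$ in \Cref{lem-nontarget} with the same stabilization argument, so your approach is consistent with the paper's methodology even though no such lemma is written out for $\Wb_5$.
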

The above lemmas give us some direct computations of the inner products between the weight matrices and input embedding vectors.

\begin{lemma}
Let \( j \in \tau(\cY) \) and \( \ell \in [2] \). Then for any \( r \in [m] \), the following holds:
\begin{align}
    \langle \Wb_{5,j,r}, \Zb_{\pred,\ell} \rangle &= V_{j,r}(g_{\ell}) \pm \tilde{O}(\sigma_0), \label{eq-inner-product-1-sym} \\
    \langle \Wb_{5,j,r}, \Zb_{\ans,\ell-1} \rangle &= V_{j,r}(y_{\ell-1}) \pm \tilde{O}(\sigma_0). \label{eq-inner-product-2-sym}
\end{align}
Moreover, for \( j \notin \tau(\cY) \) and any \( \kk \in \cI^{2,1} \) and \( r \in [m] \), we have
\begin{align}
    \big| \langle \Wb_{5,j,r}, \Zb_{\kk} \rangle \big| = \tilde{O}(\sigma_0). \label{eq-inner-product-3-sym}
\end{align}
\end{lemma}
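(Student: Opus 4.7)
The plan is to prove all three equalities by directly unpacking the clause embeddings in light of Definition~\ref{def:clause-representation} (and the accompanying encoding convention in Definition~\ref{def:lego-encoding}), and then invoking Lemma~\ref{lem-prop-irrelavant-sym} to kill the "irrelevant'' slots while reading off the "target'' slots via the definitions of $V_{j,r}(g)$ and $V_{j,r}(y)$. This is essentially a bookkeeping lemma: all the nontrivial content sits in the preceding structural lemmas, and the present statement just translates clause-level inner products into the feature notation used throughout the gradient computations.

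First I would write the predicate clause as $\Zb_{\pred,\ell}=(e_{x_\ell},e_{g_\ell},e_{x_{\ell-1}},\mathbf{0},\mathbf{0})$ and expand
\[
\langle \Wb_{5,j,r}, \Zb_{\pred,\ell}\rangle \;=\; \langle \Wb_{5,j,r,1}, e_{x_\ell}\rangle + \langle \Wb_{5,j,r,2}, e_{g_\ell}\rangle + \langle \Wb_{5,j,r,3}, e_{x_{\ell-1}}\rangle.
\]
Since $x_\ell, x_{\ell-1}\in\cX$, the pairs $(1,x_\ell)$ and $(3,x_{\ell-1})$ both fall outside $\{2\}\times\cG\cup\{5\}\times\cY$, so Lemma~\ref{lem-prop-irrelavant-sym} bounds the first and third summands by $\tilde{O}(\sigma_0)$. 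The middle term equals $V_{j,r}(g_\ell)$ by the definition of the $V$-notation, yielding \eqref{eq-inner-product-1-sym}. The same manipulation applied to $\Zb_{\ans,\ell-1}=(\mathbf{0},\mathbf{0},\mathbf{0},e_{x_{\ell-1}},e_{y_{\ell-1}})$ isolates $\langle \Wb_{5,j,r,5}, e_{y_{\ell-1}}\rangle = V_{j,r}(y_{\ell-1})$ plus an $\tilde{O}(\sigma_0)$ contribution from slot $p=4$, giving \eqref{eq-inner-product-2-sym}.

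For the $j\notin\tau(\cY)$ assertion, I would observe that for such $j$ Lemma~\ref{lem-prop-irrelavant-sym} applies uniformly to every $(p,v)$ pair, since the exception list $\{2\}\times\cG\cup\{5\}\times\cY$ in the hypothesis is irrelevant once $j\notin\tau(\cY)$. Thus for any $\kk\in\cI^{2,1}$, writing $\Zb_\kk$ as the concatenation of at most five token embeddings and summing the at most five inner products each bounded by $\tilde{O}(\sigma_0)$ gives the claimed $\tilde{O}(\sigma_0)$ bound, absorbing the constant $5$ into the $\tilde{O}(\cdot)$.

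I do not foresee any real obstacle here: the only subtlety is that Lemma~\ref{lem-prop-irrelavant-sym} as stated is a consequence of the end-of-Stage-1.1 guarantees in Theorem~\ref{thm:learning-symmetric-actions}, which is why the $\tilde{O}(\sigma_0)$ scale (the effective initialization scale that survives training of the untouched slots) appears on the right-hand side rather than a larger trained magnitude. One should be careful to note that slots $p\in\{1,3,4\}$ are never aligned with $\cG$- or $\cY$-tokens during the entirety of Stage 1.1, so their inner products with any $e_v$ remain at the initialization scale $\tilde O(\sigma_0)$; this is precisely the content of Lemma~\ref{lem-prop-irrelavant-sym} and is what makes the decomposition above clean.
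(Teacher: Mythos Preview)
Your proposal is correct and follows essentially the same approach as the paper: expand the clause embedding into its five token slots, invoke Lemma~\ref{lem-prop-irrelavant-sym} to bound the slots outside $\{2\}\times\cG\cup\{5\}\times\cY$ by $\tilde O(\sigma_0)$, and identify the surviving term with $V_{j,r}(\cdot)$. The paper in fact omits the proof of this symmetry-case lemma entirely, relying on the identical argument given for the simply-transitive analogue.
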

Furthermore, we can establish some characterizations of the \(\Lambda_{5,j,r}(\Zb^{2,\ell-1})\) quantities, which are crucial for  the following analysis.  
\begin{lemma}[Characterizations of Lambda]\label{lem-lambda-char-sym}
    Given $\Zb^{2,1}$ with  \( \{\attn_{\ans,1 \to \kk} \}_{\kk\in\cI^{2,1}}\), then 
    \begin{enumerate}[(a)]
        \item for \( j \in \tau(\cY) \), for activated neuron $r\in\fA_j$, we have  
        \begin{align*}
            \Lambda_{5,j,r}=\sum_{\ell'=1}^2 \attn_{\ans,1 \to \pred,\ell'}V_{j,r}(g_{\ell'})+  \sum_{\ell'=1}^{2} \attn_{\ans,1 \to \ans,\ell'-1} V_{j,r}(y_{\ell'-1})\pm  \tilde{O}(\sigma_0). 
        \end{align*}
         \item for \( j \in \tau(\cY) \), for any non-activated neuron $r\notin\fA_j$ 
         we have
      \begin{align*}
            \Big|\Lambda_{5,j,r}\Big| \leq {O}(\delta).
        \end{align*} 
        \item for \( j \notin \tau(\cY) \),  for any $r\in [m]$,  we have 
        \begin{align*}
            \Big|\Lambda_{5,j,r}\Big|%
            \leq \tilde{O}(\sigma_0).
        \end{align*} 
    \end{enumerate}
\end{lemma}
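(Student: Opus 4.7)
The plan is to prove all three parts by expanding the pre-activation $\Lambda_{5,j,r}$ directly from its definition in \eqref{eq-def-Lambda-icl} and then applying the inner-product identities \eqref{eq-inner-product-1-sym}--\eqref{eq-inner-product-3-sym} in concert with the feature-magnitude bounds from \Cref{lem-prop-psi-sym,lem-prop-irrelavant-sym}. First I would write
\[
\Lambda_{5,j,r}(\Zb^{2,1}) \;=\; \sum_{\kk \in \cI^{2,1}} \attn_{\ans,1\to\kk} \cdot \langle \Wb_{5,j,r}, \Zb_{\kk}\rangle \;+\; b_{5,j,r},
\]
and note that under \Cref{assump:init} the bias satisfies $b_{5,j,r} = \sigma_0\log d = \tilde{O}(\sigma_0)$, so it may be absorbed into the error term throughout. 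Because $\Zb^{2,1}$ comprises only the two predicate clauses $\Zb_{\pred,1},\Zb_{\pred,2}$ and the two answer clauses $\Zb_{\ans,0},\Zb_{\ans,1}$, the outer sum has just four terms, which keeps every accumulated $\tilde{O}(\sigma_0)$ error from blowing up.

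For part (a), I would apply \eqref{eq-inner-product-1-sym} to the two predicate inner products and \eqref{eq-inner-product-2-sym} to the two answer inner products. Both identities extract exactly the relevant $V_{j,r}(g_{\ell'})$ and $V_{j,r}(y_{\ell'-1})$ features, since the remaining token slots inside each clause are either variable slots (handled by \Cref{lem-prop-irrelavant-sym}, which yields correlations of size $\tilde{O}(\sigma_0)$) or blank slots with zero embedding (by \Cref{def:token-embedding}). Plugging these expansions back into the display above and collecting all $\tilde{O}(\sigma_0)$ remainders immediately yields the claimed formula.

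For part (b), the key observation is that $r \notin \fA_j$ means $r$ is not the activated neuron for any combination $\phi \in \fF_j$; by \eqref{attn-init-prop-sym-4} this forces $|V_{j,r}(g)|, |V_{j,r}(y)| \le O(\delta)$ for every $g\in\cG$ and every $y\in\cY$. Substituting into the formula from (a), each of the four attention-weighted terms is at most $O(\delta)$, and since the attention weights form a probability distribution the overall bound is $O(\delta)$. For part (c), when $j \notin \tau(\cY)$, \Cref{lem-prop-irrelavant-sym} gives $|\langle\Wb_{5,j,r,p},e_v\rangle| \le \tilde{O}(\sigma_0)$ for every $p\in[5]$ and $v\in\cV$; since each $\Zb_{\kk}$ is a concatenation of five such embeddings (or zeros at the blank positions), every per-clause inner product is $\tilde{O}(\sigma_0)$, and the attention-weighted sum plus the bias remains $\tilde{O}(\sigma_0)$. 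The main obstacle is essentially bookkeeping: verifying that the multiple $\tilde{O}(\sigma_0)$ and $O(\delta)$ contributions from blank-token slots, cross-slot correlations, and the bias all aggregate to the claimed bounds without hidden logarithmic or multiplicative amplification. Because attention weights sum to $1$ across only four clauses and there is no cross-token coupling, no such blow-up can occur, making this a careful but routine application of the preceding lemmas.
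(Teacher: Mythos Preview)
Your proposal is correct and follows essentially the same approach the paper uses for the analogous simply-transitive statement (\Cref{lem-lambda-char}): expand $\Lambda_{5,j,r}$ from its definition, plug in the inner-product identities \eqref{eq-inner-product-1-sym}--\eqref{eq-inner-product-3-sym}, and then invoke \eqref{attn-init-prop-sym-4} for part (b) and \Cref{lem-prop-irrelavant-sym} for part (c). The paper does not write out a separate proof for this symmetry-case lemma, evidently because it is identical in structure to the one it gives for \Cref{lem-lambda-char}.
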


A direct consequence of the above lemma is the following finer characterization of the activated neurons.

\begin{lemma}\label{lem-non-activated-neuron-sym}
    Given $j\in\tau(\cY)$, for $r\in \fA_{j}\setminus\hat{\fA}_{j}$, we have $\ReLU^{\prime}\big(\Lambda_{5, j,r}\big)=0$.
\end{lemma}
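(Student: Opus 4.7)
}
The plan is to apply the activation decomposition of Lemma~\ref{lem-lambda-char-sym}(a), observe that every term in the decomposition falls into a cancellation regime governed by Lemma~\ref{lem-prop-psi-sym}, and then verify that the resulting magnitude is sufficiently negative (below $-\varrho$) so that $\srelu'$ vanishes by Definition~\ref{def:smooth-relu}.

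First I would fix $r \in \fA_j \setminus \hat\fA_j$ and write $r = r_{j,y^\star}$ for the unique $y^\star \in \cY$ associated with this activated neuron. The assumption $r \notin \hat\fA_j$ means that neither $y^\star$ appears among the context values $\hat\cY = \{y_0,y_1\}$ nor any $g \in \fiber_{j,y^\star}$ appears among the context group elements $\hat\cG = \{g_1,g_2\}$. Consequently, every $y_{\ell'-1}$ (for $\ell' \in [2]$) satisfies $y_{\ell'-1} \neq y^\star$, and every $g_{\ell'}$ satisfies $g_{\ell'} \notin \fiber_{j,y^\star}$.

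Next I would invoke Lemma~\ref{lem-prop-psi-sym}: \eqref{attn-init-prop-sym-2} gives $V_{j,r}(y_{\ell'-1}) \leq -V_{j,r}(g) + O(\delta)$ for any $g \in \fiber_{j,y^\star}$, and \eqref{attn-init-prop-sym-3} gives $V_{j,r}(g_{\ell'}) \leq -V_{j,r}(y^\star) + O(\delta)$. Combined with \eqref{attn-init-prop-sym-1}, which ensures $V_{j,r}(g) + V_{j,r}(y^\star) \geq 2B - O(\delta)$ together with the imbalance $V_{j,r}(g) \approx C_\alpha V_{j,r}(y^\star)$ with $C_\alpha = \Theta(n_y)$, we obtain the two-sided bounds
\[
V_{j,r}(g_{\ell'}) \leq -\tfrac{2B}{C_\alpha+1} + O(\delta), \qquad V_{j,r}(y_{\ell'-1}) \leq -\tfrac{2B C_\alpha}{C_\alpha+1} + O(\delta).
\]
Substituting these into the activation formula of Lemma~\ref{lem-lambda-char-sym}(a), and using that the four attention weights sum to $1$, yields
\[
\Lambda_{5,j,r} \;\leq\; -\tfrac{2B}{C_\alpha+1}\bigl(\attn_{\ans,1\to\pred,1} + \attn_{\ans,1\to\pred,2}\bigr) - \tfrac{2BC_\alpha}{C_\alpha+1}\bigl(\attn_{\ans,1\to\ans,0} + \attn_{\ans,1\to\ans,1}\bigr) + O(\delta) + \tilde O(\sigma_0).
\]
The least negative case occurs when all attention is placed on the predicate slots, giving $\Lambda_{5,j,r} \leq -\tfrac{2B}{C_\alpha+1} + O(\delta) + \tilde O(\sigma_0)$.

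Finally, I would verify the scaling: since $B = \Theta(\log d)$, $C_\alpha = \Theta(n_y) = \Theta(\log\log d / \log\log\log d)$, and $\delta, \sigma_0 = o(1)$, we have $\tfrac{2B}{C_\alpha+1} = \Theta\!\bigl(\log d \cdot \log\log\log d / \log\log d\bigr)$, which is asymptotically much larger than $\varrho = \Theta(1/\polylog d)$. Hence $\Lambda_{5,j,r} \ll -\varrho$, and Definition~\ref{def:smooth-relu} directly yields $\srelu'(\Lambda_{5,j,r}) = 0$. The only subtle point — which I expect to be the main check rather than an obstacle — is ensuring that the factor $\tfrac{1}{C_\alpha+1}$ does not absorb the logarithmic gap between $B$ and $\varrho$; under the stated scaling of $n_y$ in Assumption~\ref{assump:structure-2-restated}, this is guaranteed.
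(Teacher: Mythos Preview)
Your proposal is correct and follows the same strategy as the paper's proof of the analogous simply-transitive lemma (Lemma~\ref{lem-non-activated-neuron}): expand $\Lambda_{5,j,r}$ via the characterization lemma, argue that every $V_{j,r}(g_{\ell'})$ and $V_{j,r}(y_{\ell'-1})$ is negative because $r=r_{j,y^\star}$ sees only ``off-target'' tokens, and conclude $\Lambda_{5,j,r}\ll -\varrho$. The paper does not spell out a separate proof for the symmetry version, so your adaptation---accounting for the feature imbalance $V_{j,r}(g)\approx C_\alpha V_{j,r}(y^\star)$ from \eqref{attn-init-prop-sym-1}, which makes the predicate-side bound only $-\Theta(B/n_y)$ rather than $-\Theta(B)$---is exactly the right refinement, and your check that $B/(C_\alpha{+}1)\gg\varrho$ under Assumption~\ref{assump:structure-2} closes the argument.
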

Now we are ready to further derive the gradients of the attention layer starting from \Cref{lem-gradients-Q43-sym,lem-gradients-Q44-sym} and the properties established above.
\begin{lemma}[Refined expression for the gradient of $\Qb_{4,3}$] \label{lem-refined-grad-Q43-sym} Given $s\in\tau(\X)$, for the diagonal entry \( [\Q_{4,3}]_{s,s} \) of the block \(\Qb_{4,3}\), letting $j_2=\tau(g_2(y_1))$, we have 
  \begin{align*}
    &\Big[-\nabla_{\Q_{4,3}}\Loss^{2,2}_{5}\Big]_{s,s}= 
      \E\Bigg[
    \attn_{{\ans,1} \rightarrow \pred,2} \cdot \\
    &~~~~~~~~~~~~~ \bigg( (1-\logit_{5,j_2})\cdot \Big(\sum_{r\in\hat{\fA}_{j_2}}\ReLU^{\prime}(\Lambda_{5,j_2, r
    })\cdot \Big( V_{j_2,  r}(g_2)- \Lambda_{5,j_2, r}\pm\tilde{O}(\sigma_0) \Big)\pm \tilde{O}(\delta^{q}) \Big)\\   
    &~~~~~~~~~~-\sum_{j\neq j_2\in\tau(\cY)}\logit_{5,j} \cdot \Big(\sum_{r\in\hat{\fA}_{j}}\ReLU^{\prime}(\Lambda_{5,j,r
    })\cdot  \Big( V_{j, r}(g_2)- \Lambda_{5,j,r}\pm\tilde{O}(\sigma_0) \Big)\pm \tilde{O}(\delta^{q}) \Big) \\
    &~~~~~~~~~~~~\pm\sum_{j\notin\tau(\cY)}\logit_{5,j}\tilde{O}(\sigma^{q}_0)  \bigg)\1_{\tau(x_1)=s}\Bigg].
\end{align*}
Moreover, for the off-diagonal entries \( [\Q_{4,3}]_{s,s'} \) with \( s \neq s' \), we have
  \begin{align*}
    &\Big[-\nabla_{\Q_{4,3}}\Loss^{2,2}_{5}\Big]_{s,s'}=  \E\Bigg[
    \attn_{{\ans,1} \rightarrow \pred,1} \cdot\\
    &~~~~~~~~~~~~~ \bigg( (1-\logit_{5,j_2})\cdot \Big(\sum_{r\in\hat{\fA}_{j_2}}\ReLU^{\prime}(\Lambda_{5,j_2, r
    })\cdot \Big( V_{j_2,  r}(g_1)- \Lambda_{5,j_2, r}\pm\tilde{O}(\sigma_0) \Big)\pm \tilde{O}(\delta^{q}) \Big)\\   
    &~~~~~~~~~~-\sum_{j\neq j_2\in\tau(\cY)}\logit_{5,j} \cdot \Big(\sum_{r\in\hat{\fA}_{j}}\ReLU^{\prime}(\Lambda_{5,j,r
    })\cdot  \Big( V_{j, r}(g_1)- \Lambda_{5,j,r}\pm\tilde{O}(\sigma_0) \Big)\pm \tilde{O}(\delta^{q}) \Big) \\
    &~~~~~~~~~~~~\pm\sum_{j\notin\tau(\cY)}\logit_{5,j}\tilde{O}(\sigma^{q}_0)  \bigg)\1_{\tau(x_1)=s,\tau(x_0)=s'}\Bigg].
\end{align*}
\end{lemma}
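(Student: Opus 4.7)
The plan is to mirror the derivation of the analogous simply transitive statement (Lemma~\ref{lem-refined-grad-Q43}) but with all structural ingredients replaced by their symmetry-case counterparts from Section~\ref{sec:useful-bounds-for-gradients-sym}. Starting from the bare gradient expression in Lemma~\ref{lem-gradients-Q43-sym}, I will unpack $\dbrack{\Wb_{5,j,r},\Z_{\pred,2}} - \Lambda_{5,j,r} + b_{5,j,r}$ using the inner-product identity \eqref{eq-inner-product-1-sym}, which converts the first term into $V_{j,r}(g_2)$ up to an additive $\tilde{O}(\sigma_0)$ error, and partition the $j$-sum into $j = j_2$, $j \in \tau(\cY)\setminus\{j_2\}$, and $j \notin \tau(\cY)$ using the definition of $\Ecal_{5,j}$.

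The key steps, in order, are as follows. First, I would rewrite the definition of $\Xi^{2}_{2,5,\pred,2}$ in Lemma~\ref{lem-gradients-Q43-sym} by substituting \eqref{eq-inner-product-1-sym} for $j\in\tau(\cY)$ and \eqref{eq-inner-product-3-sym} for $j\notin\tau(\cY)$, which gives the clean decomposition into $V_{j,r}(g_2)$-terms plus $\tilde O(\sigma_0)$ residuals and, for $j\notin\tau(\cY)$, purely $\tilde O(\sigma_0)$ contributions. Second, I would split the neuron sum $\sum_{r\in[m]}$ into $r \in \hat{\fA}_j$ and $r\in [m]\setminus\hat{\fA}_j$. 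Lemma~\ref{lem-non-activated-neuron-sym} shows that, for $r\in\fA_j\setminus\hat{\fA}_j$, $\ReLU'(\Lambda_{5,j,r})=0$, so only the neurons in $\hat{\fA}_j$ and in $[m]\setminus\fA_j$ contribute. For $r\in[m]\setminus\fA_j$, Lemma~\ref{lem-lambda-char-sym}(b) yields $|\Lambda_{5,j,r}|\le O(\delta)$, and since $\Lambda_{5,j,r}$ lies in the smoothed-regime of $\mathbf{sReLU}$, $\ReLU'(\Lambda_{5,j,r}) = O(\delta^{q-1})$. Combined with the $V_{j,r}(g_2) + b_{5,j,r} = O(\delta) + O(\mu)$ envelope from Lemmas~\ref{lem-prop-psi-sym}-\ref{lem-prop-irrelavant-sym}, each such neuron contributes $\tilde O(\delta^q)$, and summing over $r$ gives the $\pm \tilde O(\delta^q)$ error absorbed into the statement. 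Third, for $j\notin\tau(\cY)$, Lemma~\ref{lem-lambda-char-sym}(c) gives $|\Lambda_{5,j,r}|\le\tilde O(\sigma_0)$, so the whole $j$-block collapses to the $\pm \sum_{j\notin\tau(\cY)} \logit_{5,j}\tilde O(\sigma_0^q)$ error term. Fourth, using the decomposition $\Ecal_{5,j} = \1_{\tau(g_2(y_1))=j} - \logit_{5,j}$, the $j = j_2$ piece yields the $(1-\logit_{5,j_2})$ factor and the remaining $j \neq j_2$ pieces yield the $-\logit_{5,j}$ factor displayed in the claim. Finally, the same argument applies to the off-diagonal entry with $\attn_{\ans,1\to\pred,1}$ in place of $\attn_{\ans,1\to\pred,2}$, with the inner product $\dbrack{\Wb_{5,j,r},\Z_{\pred,1}}$ producing $V_{j,r}(g_1)$ and the indicator $\1_{\tau(x_1)=s,\,\tau(x_0)=s'}$ surviving from Lemma~\ref{lem-gradients-Q43-sym}.

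The main obstacle, and the one genuine departure from the simply transitive proof, is the bookkeeping around the fiber structure. In the simply transitive case, the set $\hat{\fA}_j$ indexed one neuron per appearing $(g,y)$ pair; in the symmetry setting, many elements $g \in \fiber_{j,y}$ collapse to the same activated neuron $r_{j,y}$, so one must be careful not to double-count contributions when expanding $\Lambda_{5,j,r}$ via Lemma~\ref{lem-lambda-char-sym}(a) and applying Lemma~\ref{lem-prop-psi-sym} to bound $V_{j,r}(g_2)$. However, the statement is written entirely in terms of $\sum_{r\in\hat{\fA}_j}$ and leaves the values $V_{j,r}(g_2) - \Lambda_{5,j,r}$ explicit, so the collapse is already absorbed into the notation and no additional combinatorial work is required; the symmetry-specific feature imbalance $V_{j,r}(g) \approx 2B$, $V_{j,r}(y)\approx 2B/n_y$ from Theorem~\ref{thm:learning-symmetric-actions} will only enter in the subsequent gradient estimates, not in this structural lemma. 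Hence the proof reduces to a direct substitution exercise, essentially identical in form to the simply transitive argument, and can safely be omitted as noted for the analogous result.
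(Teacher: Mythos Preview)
Your proposal is correct and matches the paper's approach. The paper omits the proof of this lemma entirely, and the structure you describe---starting from Lemma~\ref{lem-gradients-Q43-sym}, substituting \eqref{eq-inner-product-1-sym}--\eqref{eq-inner-product-3-sym}, splitting the $j$-sum into $j=j_2$, $j\in\tau(\cY)\setminus\{j_2\}$, $j\notin\tau(\cY)$, and reducing the neuron sum via Lemma~\ref{lem-lambda-char-sym} and Lemma~\ref{lem-non-activated-neuron-sym}---is precisely the derivation carried out explicitly in the proof of the simply-transitive analogue Lemma~\ref{lem-refined-grad-Q43}. Your observation that the fiber structure is already absorbed into the $\hat{\fA}_j$ notation and thus requires no additional bookkeeping at this structural stage is also correct.
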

\begin{lemma}[Refined expression for the gradient of $\Qb_{4,4}$] \label{lem-refined-grad-Q44-sym} Given $s\in\tau(\X)$, for the diagonal entry \( [\Q_{4,4}]_{s,s} \) of the block \(\Qb_{4,4}\), letting $j_2=\tau(g_2(y_1))$, we have 
  \begin{align*}
    &\Big[-\nabla_{\Q_{4,4}}\Loss^{2,2}_{5}\Big]_{s,s}= 
      \E\Bigg[
    \attn_{{\ans,1} \rightarrow \ans,1} \cdot \\
    &~~~~~~~~~~~~~ \bigg( (1-\logit_{5,j_2})\cdot \Big(\sum_{r\in\hat{\fA}_{j_2}}\ReLU^{\prime}(\Lambda_{5,j_2, r
    })\cdot \Big( V_{j_2,  r}(y_1)- \Lambda_{5,j_2, r}\pm\tilde{O}(\sigma_0) \Big)\pm \tilde{O}(\delta^{q}) \Big)\\   
    &~~~~~~~~~~-\sum_{j\neq j_2\in\tau(\cY)}\logit_{5,j} \cdot \Big(\sum_{r\in\hat{\fA}_{j}}\ReLU^{\prime}(\Lambda_{5,j,r
    })\cdot  \Big( V_{j, r}(y_1)- \Lambda_{5,j,r}\pm\tilde{O}(\sigma_0) \Big)\pm \tilde{O}(\delta^{q}) \Big) \\
    &~~~~~~~~~~~~\pm\sum_{j\notin\tau(\cY)}\logit_{5,j}\tilde{O}(\sigma^{q}_0)  \bigg)\1_{\tau(x_1)=s}\Bigg].
\end{align*}
Moreover, for the off-diagonal entries \( [\Q_{4,3}]_{s,s'} \) with \( s \neq s' \), we have 
  \begin{align*}
    &\Big[-\nabla_{\Q_{4,4}}\Loss^{2,2}_{5}\Big]_{s,s'}=  \E\Bigg[
    \attn_{{\ans,1} \rightarrow \ans,0} \cdot\\
    &~~~~~~~~~~~~~ \bigg( (1-\logit_{5,j_2})\cdot \Big(\sum_{r\in\hat{\fA}_{j_2}}\ReLU^{\prime}(\Lambda_{5,j_2, r
    })\cdot \Big( V_{j_2,  r}(y_0)- \Lambda_{5,j_2, r}\pm\tilde{O}(\sigma_0) \Big)\pm \tilde{O}(\delta^{q}) \Big)\\   
    &~~~~~~~~~~-\sum_{j\neq j_2\in\tau(\cY)}\logit_{5,j} \cdot \Big(\sum_{r\in\hat{\fA}_{j}}\ReLU^{\prime}(\Lambda_{5,j,r
    })\cdot  \Big( V_{j, r}(y_0)- \Lambda_{5,j,r}\pm\tilde{O}(\sigma_0) \Big)\pm \tilde{O}(\delta^{q}) \Big) \\
    &~~~~~~~~~~~~\pm\sum_{j\notin\tau(\cY)}\logit_{5,j}\tilde{O}(\sigma^{q}_0)  \bigg)\1_{\tau(x_1)=s,\tau(x_0)=s'}\Bigg].
\end{align*}
\end{lemma}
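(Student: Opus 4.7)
The statement is a refined gradient formula, and the proof will be a direct computation that parallels the proof of the simply transitive analogue \Cref{lem-refined-grad-Q43}. My starting point is the raw expression from \Cref{lem-gradients-Q44-sym}, which writes $[-\nabla_{\Qb_{4,4}}\Loss_5^{2,2}]_{s,s}$ as an expectation over $\Zb^{2,1}$ of $\attn_{\ans,1\to\ans,1}$ times a sum over $j\in[d]$ and $r\in[m]$ of $\Ecal_{5,j}\cdot \ReLU'(\Lambda_{5,j,r})\cdot(\langle\Wb_{5,j,r},\Zb_{\ans,1}\rangle-\Lambda_{5,j,r}+b_{5,j,r})$, restricted to the event $\{s=\tau(x_1)\}$. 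The plan is to split the sum over $j$ into three regimes and, within each regime, prune the sum over $r$ using the sparse-activation results established in \Cref{sec:useful-bounds-for-gradients-sym}.

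First, I decompose $\Ecal_{5,j}=\1_{Z_{\ans,2,5}=e_j}-\logit_{5,j}$. Since the target value is $y_2=g_2(y_1)$, the indicator fires precisely when $j=j_2=\tau(g_2(y_1))$, producing a $(1-\logit_{5,j_2})$ factor there and a pure $-\logit_{5,j}$ factor for every $j\neq j_2$. Second, I substitute $\langle\Wb_{5,j,r},\Zb_{\ans,1}\rangle=V_{j,r}(y_1)\pm \tilde{O}(\sigma_0)$ using \eqref{eq-inner-product-2-sym} whenever $j\in\tau(\cY)$, and use \eqref{eq-inner-product-3-sym} together with $b_{5,j,r}=\sigma_0\log d$ to absorb the $j\notin\tau(\cY)$ contribution into the $\pm\sum_{j\notin\tau(\cY)}\logit_{5,j}\tilde{O}(\sigma_0^q)$ error term that appears in the statement (noting that for such $j$ we also have $\Lambda_{5,j,r}=\tilde{O}(\sigma_0)$ by \Cref{lem-lambda-char-sym}(c), so $\ReLU'(\Lambda_{5,j,r})=\tilde{O}(\sigma_0^{q-1})$).

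Third, for $j\in\tau(\cY)$, I prune the inner sum over $r\in[m]$ to $r\in\hat{\fA}_j$: for $r\in\fA_j\setminus\hat{\fA}_j$, \Cref{lem-non-activated-neuron-sym} gives $\ReLU'(\Lambda_{5,j,r})=0$; for $r\notin\fA_j$, \Cref{lem-lambda-char-sym}(b) gives $|\Lambda_{5,j,r}|\le O(\delta)$, so each such neuron contributes $\tilde{O}(\delta^{q-1})\cdot O(\delta)=\tilde{O}(\delta^q)$ to the parenthesised quantity and, summed over $O(m)$ neurons, produces the stated $\pm\tilde{O}(\delta^q)$ remainder. Combining these three reductions yields the displayed diagonal formula. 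The off-diagonal formula is obtained by the same calculation, with the only two differences being: (i) the attention factor is $\attn_{\ans,1\to\ans,0}$ rather than $\attn_{\ans,1\to\ans,1}$, since only that attention target carries the $s'$ index through the $\Zb_{\ans,0,4}=e_{x_0}$ slot; and (ii) $\langle\Wb_{5,j,r},\Zb_{\ans,0}\rangle=V_{j,r}(y_0)\pm\tilde{O}(\sigma_0)$ replaces $V_{j,r}(y_1)$. The indicator becomes $\1_{\{\tau(x_1)=s,\tau(x_0)=s'\}}$ because these two slots are precisely the tokens paired by the $\Qb_{4,4}$ block off the diagonal.

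There is no real obstacle: the statement is essentially an identity plus a careful tracking of error terms. The only place requiring a moment of care is the bookkeeping for $j\notin\tau(\cY)$, where one must verify that $\logit_{5,j}$ is not inflated (it is upper bounded by $O(1/d)$ via \Cref{assumption:output-bound} and the dominance of the $j\in\tau(\cY)$ logits from \Cref{thm:learning-symmetric-actions}) so that the aggregated $\tilde{O}(\sigma_0^q)$ remainder is genuinely negligible against the main terms used in the subsequent stage-2 analysis.
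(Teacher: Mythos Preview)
Your proposal is correct and follows essentially the same approach as the paper: the paper does not write out a separate proof for this lemma (nor for the analogous \Cref{lem-refined-grad-Q44} in the simply transitive case), instead treating it as a direct parallel of the fully written-out proof of \Cref{lem-refined-grad-Q43}, which is precisely the decomposition and pruning you outline.
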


Following the above calculations, we can further  obtain the gradient summation of $\Qb_{4,3}$ and $\Qb_{4,4}$ as follows:
\begin{lemma}[Gradient sum of $\Qb_{4,3}$ and $\Qb_{4,4}$]
    \label{lem-grad-sum-sym} Given $s\in\tau(\X)$, letting $j_2=\tau(g_2(y_1))$, we have
    \begin{align*}
        &\Big[-\nabla_{\Qb_{4,3}}\Loss_5^{2,2}\Big]_{s,s}+ \Big[-\nabla_{\Qb_{4,3}}\Loss_5^{2,2}\Big]_{s,s}\\
          &=\E\Bigg[
     \bigg( (1-\logit_{5,j_2})\cdot \Big(\sum_{r\in\hat{\fA}_{j_2}}\ReLU^{\prime}(\Lambda_{5,j_2, r
         })\cdot \\
         &~~~~~~~~~~\Big( -\attn_{{\ans,1} \rightarrow \ans,0} \cdot V_{j_2,  r}(y_0)-\attn_{{\ans,1} \rightarrow \pred,1} \cdot V_{j_2,  r}(g_1)\\
         &~~~~~~~~~~~~~~~~~~+\big(1-\attn_{{\ans,1} \rightarrow \ans,1}-\attn_{{\ans,1} \rightarrow \pred,2}\big) \Lambda_{5,j_2, r}\pm\tilde{O}(\sigma_0) \Big)\pm \tilde{O}(\delta^{q}) \Big)\\   
         &~~~~~+\sum_{j\neq j_2\in\tau(\cY)}\logit_{5,j} \cdot \Big(\sum_{r\in\hat{\fA}_{j}}\ReLU^{\prime}(\Lambda_{5,j,r
         })\cdot  \\
        &~~~~~~~~~~\Big( \attn_{{\ans,1} \rightarrow \ans,0} \cdot V_{j,  r}(y_0)+\attn_{{\ans,1} \rightarrow \pred,1} \cdot V_{j,  r}(g_1)\\
         &~~~~~~~~~~~~~~~~~~-\big(1-\attn_{{\ans,1} \rightarrow \ans,1}-\attn_{{\ans,1} \rightarrow \pred,2}\big) \Lambda_{5,j, r}\pm\tilde{O}(\sigma_0) \Big)\pm \tilde{O}(\delta^{q}) \Big)\\      &~~~~~~~~~~~~\pm\sum_{j\notin\tau(\cY)}\logit_{5,j}\cdot \big(\attn_{{\ans,1} \rightarrow \ans,1}+\attn_{{\ans,1} \rightarrow \pred,2}\big)\tilde{O}(\sigma^{q}_0)  \bigg)\1_{\tau(x_1)=s}\Bigg].
     \end{align*}
\end{lemma}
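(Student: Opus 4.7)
The plan is to establish this identity by direct addition of the two gradient expressions already derived in Lemmas~\ref{lem-refined-grad-Q43-sym} and \ref{lem-refined-grad-Q44-sym}, followed by an algebraic simplification driven by the characterization of $\Lambda_{5,j,r}$ in Lemma~\ref{lem-lambda-char-sym}. First I would note that the diagonal gradients $[-\nabla_{\Qb_{4,3}}\Loss_5^{2,2}]_{s,s}$ and $[-\nabla_{\Qb_{4,4}}\Loss_5^{2,2}]_{s,s}$ share a common outer expectation against the indicator $\1_{\tau(x_1)=s}$, the logit factor $(1-\logit_{5,j_2})$ for the target class and $-\logit_{5,j}$ for distractor classes, and the activation derivative $\ReLU'(\Lambda_{5,j,r})$. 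The only difference between the two expressions is (i) the attention weight in front, $\attn_{\ans,1\to\pred,2}$ versus $\attn_{\ans,1\to\ans,1}$, and (ii) the inner parenthetical, which is $V_{j,r}(g_2)-\Lambda_{5,j,r}$ in the first case and $V_{j,r}(y_1)-\Lambda_{5,j,r}$ in the second. So summing the two reduces to analyzing the combined bracket
\[
\attn_{\ans,1\to\pred,2}\bigl(V_{j,r}(g_2)-\Lambda_{5,j,r}\bigr) + \attn_{\ans,1\to\ans,1}\bigl(V_{j,r}(y_1)-\Lambda_{5,j,r}\bigr).
\]

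Next, I would invoke Lemma~\ref{lem-lambda-char-sym}(a) for activated neurons $r \in \hat{\fA}_{j}$, which expresses
\[
\Lambda_{5,j,r} = \attn_{\ans,1\to\pred,1}V_{j,r}(g_1)+\attn_{\ans,1\to\pred,2}V_{j,r}(g_2)+\attn_{\ans,1\to\ans,0}V_{j,r}(y_0)+\attn_{\ans,1\to\ans,1}V_{j,r}(y_1)\pm\tilde O(\sigma_0).
\]
Substituting the identity $\attn_{\ans,1\to\pred,2}V_{j,r}(g_2)+\attn_{\ans,1\to\ans,1}V_{j,r}(y_1) = \Lambda_{5,j,r}-\attn_{\ans,1\to\pred,1}V_{j,r}(g_1)-\attn_{\ans,1\to\ans,0}V_{j,r}(y_0)\pm\tilde O(\sigma_0)$ into the combined bracket yields
\[
\bigl(1-\attn_{\ans,1\to\pred,2}-\attn_{\ans,1\to\ans,1}\bigr)\Lambda_{5,j,r}-\attn_{\ans,1\to\pred,1}V_{j,r}(g_1)-\attn_{\ans,1\to\ans,0}V_{j,r}(y_0)\pm\tilde O(\sigma_0),
\]
which is precisely the coefficient appearing in the claimed formula for the $j=j_2$ and $j\neq j_2$ terms (with the overall sign flip between the positive $(1-\logit_{5,j_2})$ term and the negative $\logit_{5,j}$ terms inherited from the original gradient expressions). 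Propagating this substitution under the expectation and the $\ReLU'(\Lambda_{5,j,r})$ weighting gives the stated $j\in\tau(\cY)$ contributions.

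For the $j\notin\tau(\cY)$ tail term, I would apply Lemma~\ref{lem-lambda-char-sym}(c) together with Lemma~\ref{lem-prop-irrelavant-sym}, which jointly control every inner product $\langle\Wb_{5,j,r,p},\Zb_{\kk,p}\rangle$ by $\tilde O(\sigma_0)$ whenever $j\notin\tau(\cY)$, so both of the original bracketed quantities $\langle\Wb_{5,j,r},\Zb_{\pred,2}\rangle-\Lambda_{5,j,r}$ and $\langle\Wb_{5,j,r},\Zb_{\ans,1}\rangle-\Lambda_{5,j,r}$ are themselves $\tilde O(\sigma_0)$; after raising to the $q$-th power by the activation (and bounding $\ReLU'$ by $\tilde O(\sigma_0^{q-1})$ in this regime) and multiplying by the attention coefficient $\attn_{\ans,1\to\pred,2}+\attn_{\ans,1\to\ans,1}$, one gets the $\pm\logit_{5,j}(\attn_{\ans,1\to\pred,2}+\attn_{\ans,1\to\ans,1})\tilde O(\sigma_0^{q})$ term as claimed.

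The main obstacle, such as it is, is not conceptual but bookkeeping: one must carefully track the sign asymmetry between the target class $j_2$ (gradient proportional to $+(1-\logit_{5,j_2})$) and the distractor classes $j\neq j_2$ (proportional to $-\logit_{5,j}$), and verify that the $V_{j,r}(g_2)$ and $V_{j,r}(y_1)$ pieces recombine via the $\Lambda_{5,j,r}$ identity identically in both cases up to a global sign. The $\pm\tilde O(\sigma_0)$ and $\pm\tilde O(\delta^q)$ error terms from Lemmas~\ref{lem-lambda-char-sym} and \ref{lem-non-activated-neuron-sym} propagate additively and do not interact across the two gradients, so no care is needed there beyond absorbing them into the existing error envelope in the claimed formula.
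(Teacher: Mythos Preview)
Your proposal is correct and follows essentially the same approach as the paper, which simply states that the result follows directly from adding the refined gradient expressions of Lemmas~\ref{lem-refined-grad-Q43-sym} and \ref{lem-refined-grad-Q44-sym}. Your explicit use of the $\Lambda_{5,j,r}$ decomposition from Lemma~\ref{lem-lambda-char-sym}(a) to recombine the $V_{j,r}(g_2)$ and $V_{j,r}(y_1)$ pieces is exactly the algebraic step the paper leaves implicit.
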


\paragraph{Notations for gradient decompositions.} We shall define some useful notations to further simplify the expressions of gradient.
\begin{lemma}\label{lem-grad-decompositions-sym}
    For any \(s\in\tau(\X)\), we define the following notations for the gradient decompositions:
    \begin{enumerate}
        \item for $[\Q_{4,3}]_{s,s}$  we have $\big[-\nabla_{\Qb_{4,3}}\Loss_5^{2,2}\big]_{s,s}=\E\big[\cN_{s,3,2,\rom1}+\cN_{s,3,2,\rom2}+\cN_{s,3,2,\rom3}\big]$, where 
      \begin{align}
       \cN_{s,3,2,\rom1}&= 
    \attn_{{\ans,1} \rightarrow \pred,2} \cdot (1-\logit_{5,j_2})\cdot \label{eq-def-N-s-3-2-1-sym} \\
    &~~~~~~~~~  \Big(\sum_{r\in\hat{\fA}_{j_2}}\ReLU^{\prime}(\Lambda_{5,j_2, r
    })
    \cdot \Big( V_{j_2,  r}(g_2)- \Lambda_{5,j_2, r}\pm\tilde{O}(\sigma_0) \Big)\pm \tilde{O}(\delta^{q}) \Big) 
    \1_{\tau(x_1)=s},\notag
\end{align}
      \begin{align}\label{eq-def-N-s-3-2-2-sym}
       \cN_{s,3,2,\rom2}&= -
    \attn_{{\ans,1} \rightarrow \pred,2} \cdot\sum_{j\neq j_2\in\tau(\cY)}\logit_{5,j} \cdot \\
    &~~~~~~~~~~\Big(\sum_{r\in\hat{\fA}_{j}}\ReLU^{\prime}(\Lambda_{5,j,r
    })\cdot  \Big( V_{j, r}(g_2)- \Lambda_{5,j,r}\pm\tilde{O}(\sigma_0) \Big)\pm \tilde{O}(\delta^{q}) \Big) 
\1_{\tau(x_1)=s},\notag
\end{align}
      \begin{align}\label{eq-def-N-s-3-2-3-sym}
       \cN_{s,3,2,\rom3}&= \pm
    \attn_{{\ans,1} \rightarrow \pred,2} \cdot\sum_{j\notin\tau(\cY)}\logit_{5,j}\tilde{O}(\sigma^{q}_0)\1_{\tau(x_1)=s}.~~~~~~~~~~~~~~~~~~~~~~~~~~~~~~
\end{align}
    \item for $[\Q_{4,4}]_{s,s}$, we have $\big[-\nabla_{\Qb_{4,4}}\Loss_5^{2,2}\big]_{s,s}=\E\big[\cN_{s,4,2,\rom1}+\cN_{s,4,2,\rom2}+\cN_{s,4,2,\rom3}\big]$, where 
      \begin{align}\label{eq-def-N-s-4-2-1-sym}
       \cN_{s,4,2,\rom1}&=
    \attn_{{\ans,1} \rightarrow \ans,1} \cdot (1-\logit_{5,j_2})\cdot \\
    &~~~~~~~~~  \Big(\sum_{r\in\hat{\fA}_{j_2}}\ReLU^{\prime}(\Lambda_{5,j_2, r
    })
    \cdot \Big( V_{j_2,  r}(y_1)- \Lambda_{5,j_2, r}\pm\tilde{O}(\sigma_0) \Big)\pm \tilde{O}(\delta^{q}) \Big) 
    \1_{\tau(x_1)=s},\notag
\end{align}
      \begin{align}\label{eq-def-N-s-4-2-2-sym}
       \cN_{s,4,2,\rom2}&= -
    \attn_{{\ans,1} \rightarrow \ans,1} \cdot\sum_{j\neq j_2\in\tau(\cY)}\logit_{5,j} \cdot \\
    &~~~~~~~~~~\Big(\sum_{r\in\hat{\fA}_{j}}\ReLU^{\prime}(\Lambda_{5,j,r
    })\cdot  \Big( V_{j, r}(y_1)- \Lambda_{5,j,r}\pm\tilde{O}(\sigma_0) \Big)\pm \tilde{O}(\delta^{q}) \Big) 
\1_{\tau(x_1)=s},\notag
\end{align}
      \begin{align}\label{eq-def-N-s-4-2-3-sym}
       \cN_{s,4,2,\rom3}&= \pm
    \attn_{{\ans,1} \rightarrow \ans,1} \cdot\sum_{j\notin\tau(\cY)}\logit_{5,j}\tilde{O}(\sigma^{q}_0)\1_{\tau(x_1)=s}.~~~~~~~~~~~~~~~~~~~~~~~~~~~~~~
\end{align}
\item for the summation of $[\Q_{4,3}]_{s,s}$ and $[\Q_{4,4}]_{s,s}$, we have $\big[-\nabla_{\Qb_{4,3}}\Loss_5^{2,2}\big]_{s,s}+\big[-\nabla_{\Qb_{4,4}}\Loss_5^{2,2}\big]_{s,s}=\E\big[\cN_{s,2,\rom1}+\cN_{s,2,\rom2}+\cN_{s,2,\rom3}\big]$, where 
 \begin{align*}
    \cN_{s,2,\rom1}&=
 (1-\logit_{5,j_2})\cdot \Big(\sum_{r\in\hat{\fA}_{j_2}}\ReLU^{\prime}(\Lambda_{5,j_2, r
     })\cdot \\
     &~~~~~~~~~~\Big( -\attn_{{\ans,1} \rightarrow \ans,0} \cdot V_{j_2,  r}(y_0)-\attn_{{\ans,1} \rightarrow \pred,1} \cdot V_{j_2,  r}(g_1)\\
     &~~~~~~+\big(1-\attn_{{\ans,1} \rightarrow \ans,1}-\attn_{{\ans,1} \rightarrow \pred,2}\big) \Lambda_{5,j_2, r}\pm\tilde{O}(\sigma_0) \Big)\pm \tilde{O}(\delta^{q}) \Big)\1_{\tau(x_1)=s}\\   
     \cN_{s,2,\rom2}&=   \sum_{j\neq j_2\in\tau(\cY)}\logit_{5,j} \cdot \Big(\sum_{r\in\hat{\fA}_{j}}\ReLU^{\prime}(\Lambda_{5,j,r
     })\cdot  \\
    &~~~~~~~~~~\Big( \attn_{{\ans,1} \rightarrow \ans,0} \cdot V_{j,  r}(y_0)+\attn_{{\ans,1} \rightarrow \pred,1} \cdot V_{j,  r}(g_1)\\
     &~~~~~~~-\big(1-\attn_{{\ans,1} \rightarrow \ans,1}-\attn_{{\ans,1} \rightarrow \pred,2}\big) \Lambda_{5,j, r}\pm\tilde{O}(\sigma_0) \Big)\pm \tilde{O}(\delta^{q}) \Big)\1_{\tau(x_1)=s}\\      \cN_{s,2,\rom3}&=\pm\sum_{j\notin\tau(\cY)}\logit_{5,j}\cdot \big(\attn_{{\ans,1} \rightarrow \ans,1}+\attn_{{\ans,1} \rightarrow \pred,2}\big)\tilde{O}(\sigma^{q}_0)  \1_{\tau(x_1)=s}.
 \end{align*}
    \end{enumerate}
\end{lemma}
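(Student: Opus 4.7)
\textbf{Proof proposal for \Cref{lem-grad-decompositions-sym}.} The plan is to verify the three claimed decompositions by starting from the refined gradient formulas in \Cref{lem-refined-grad-Q43-sym} and \Cref{lem-refined-grad-Q44-sym} and splitting the outer sum over $j\in[d]$ into three disjoint pieces according to how $j$ relates to $j_2 := \tau(g_2(y_1))$ and $\tau(\cY)$. Writing $\Ecal_{5,j} = \1\{j = \tau(Z_{\ans,2,5})\} - \logit_{5,j}$, the summand at $j=j_2$ carries the factor $(1-\logit_{5,j_2})$ and defines $\cN_{s,p,2,\rom1}$; the summands at $j\in\tau(\cY)\setminus\{j_2\}$ carry the factor $-\logit_{5,j}$ and define $\cN_{s,p,2,\rom2}$; the summands at $j\notin\tau(\cY)$ define $\cN_{s,p,2,\rom3}$. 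This immediately reproduces formulas \eqref{eq-def-N-s-3-2-1-sym}--\eqref{eq-def-N-s-3-2-3-sym} and \eqref{eq-def-N-s-4-2-1-sym}--\eqref{eq-def-N-s-4-2-3-sym} once the neuron index set is restricted to $\hat{\fA}_j$ for $j\in\tau(\cY)$ via \Cref{lem-non-activated-neuron-sym}, which zeroes $\ReLU'(\Lambda_{5,j,r})$ outside $\hat{\fA}_j$.

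For the third piece ($j\notin\tau(\cY)$), I would use \Cref{lem-lambda-char-sym}(c) to bound $|\Lambda_{5,j,r}|\le \tilde O(\sigma_0)$, \Cref{lem-prop-irrelavant-sym} to bound $|\dbrack{\Wb_{5,j,r},\Z_\kk}|\le \tilde O(\sigma_0)$ for every relevant slot, and the bias $b_{5,j,r}=\sigma_0\log d$ to conclude that the inner-product-minus-$\Lambda$-plus-bias factor is $\tilde O(\sigma_0)$, while $\ReLU'(\Lambda_{5,j,r})$ lies in the smoothed regime and is at most $\tilde O(\sigma_0^{q-1})$. Their product sums to $\tilde O(\sigma_0^q)$, absorbing the entire $j\notin\tau(\cY)$ contribution into the stated $\pm\sum_{j\notin\tau(\cY)}\logit_{5,j}\,\tilde O(\sigma_0^q)$ error. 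The appearance of $\dbrack{\Wb_{5,j,r},\Zb_{\pred,2}}=V_{j,r}(g_2)\pm\tilde O(\sigma_0)$ in the $j\in\tau(\cY)$ piece follows from \eqref{eq-inner-product-1-sym}, and similarly \eqref{eq-inner-product-2-sym} gives the $V_{j,r}(y_1)$ form for the $\Qb_{4,4}$ expression.

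For the summation identity in part (3), the core algebraic step is to combine the $\Qb_{4,3}$ and $\Qb_{4,4}$ contributions and use the formula for $\Lambda_{5,j,r}$ supplied by \Cref{lem-lambda-char-sym}(a) to rewrite the cross-term. Concretely, adding the two diagonal gradient expressions yields terms of the form
\begin{align*}
\attn_{\ans,1\to\pred,2}\bigl(V_{j,r}(g_2)-\Lambda_{5,j,r}\bigr)+\attn_{\ans,1\to\ans,1}\bigl(V_{j,r}(y_1)-\Lambda_{5,j,r}\bigr),
\end{align*}
and substituting $\Lambda_{5,j,r}=\sum_{\ell'=1}^2\attn_{\ans,1\to\pred,\ell'}V_{j,r}(g_{\ell'})+\sum_{\ell'=1}^2\attn_{\ans,1\to\ans,\ell'-1}V_{j,r}(y_{\ell'-1})\pm\tilde O(\sigma_0)$ and simplifying produces exactly
$(1-\attn_{\ans,1\to\ans,1}-\attn_{\ans,1\to\pred,2})\Lambda_{5,j,r}-\attn_{\ans,1\to\pred,1}V_{j,r}(g_1)-\attn_{\ans,1\to\ans,0}V_{j,r}(y_0)$,
which is the bracketed quantity in the statement. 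The outer attention weights in the error term $\tilde O(\sigma_0^q)$ become $(\attn_{\ans,1\to\ans,1}+\attn_{\ans,1\to\pred,2})$ because those are the coefficients carried from the two diagonal gradients after cancellation.

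The statement is essentially a bookkeeping lemma, so there is no genuine analytic obstacle; the main care point is to propagate the $\tilde O(\sigma_0)$ slack in the $\Lambda$--$V$ identity through the $(V-\Lambda)$ differences so that it stays within the claimed $\pm\tilde O(\sigma_0)$ envelope and does not inflate after being multiplied by the $\ReLU'(\Lambda_{5,j,r})$ factor, which in the activated regime is $O(1)$ but in the smoothed regime can be as large as $\varrho^{-(q-1)}$ times an $O(\sigma_0^{q-1})$ quantity. Checking that this slack interacts correctly with $\ReLU'$ on $\hat{\fA}_{j}$, using that for $r\in\hat{\fA}_j$ the activation $\Lambda_{5,j,r}$ is either $\Omega(\varrho)$ or $\le O(\delta)$, is the only place the proof needs to be explicit rather than purely algebraic.
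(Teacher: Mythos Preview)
Your proposal is correct and matches the paper's approach: the lemma is purely notational bookkeeping built on top of \Cref{lem-refined-grad-Q43-sym}, \Cref{lem-refined-grad-Q44-sym}, and \Cref{lem-grad-sum-sym}, and the paper does not supply a separate proof. Your three-way split of the sum over $j$ and the algebraic substitution of the $\Lambda_{5,j,r}$ expansion from \Cref{lem-lambda-char-sym}(a) to obtain part~(3) are exactly right; the only minor point is that the $\tilde O(\sigma_0)$ and $\tilde O(\delta^q)$ slacks you worry about tracking through $\ReLU'$ are already baked into the refined gradient lemmas upstream, so no additional work is needed there.
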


\paragraph{Probabilistic events.} We conclude this subsection by introducing several probabilistic events
that will be used to simplify the characterization of activated neurons in the subsequent analysis. We first define some events that may contribute non-trivially to the gradient of $\Qb_{4,3}$ and $\Qb_{4,4}$:
\begin{align}
    \cE_{1}&=\Big\{y_0\neq y_1, g_{1}(y_{\ell-1})\not=g_{2}(y_{\ell-1}), \text{ for all } \ell\in [2]\Big\} ,
\label{eq-event1-non}\\
    \cE_{2}&=\Big\{ g_1, g_2\in \fiber_{\tau(g_2(y_0)),y_0}, g_1(y_1)\neq g_2(y_1)\Big\}, \label{eq-event2-non}\\
    \cE_{3}&=\Big\{y_0=y_1, g_1(y_1)\neq g_2(y_1)\Big\}, \label{eq-event3-non}\\
    \cE_{4}&=\Big\{y_0\neq y_1, g_{1}(y_{\ell-1})=g_{2}(y_{\ell-1}), \text{ for all } \ell\in [2] \Big\} .
\label{eq-event4-non}
\end{align}

We first observe that event $\cE_1$ occurs with high probability $1 - O\big(\frac{1}{n_c}\big)$, and serves as the primary regime of interest.
In contrast, events $\cE_2$ and $\cE_3$ each occur with probability $\Theta(1/n_y)$ and correspond to instances of initial prediction ambiguity, where certain incorrect classes may exhibit disproportionately large logits. Similarly, $\cE_4$ corresponds to instances of initial prediction ambiguity, while occurs with probability $\Theta(1/n^2_y)$.
Furthermore, we define the following events, which yield negligible gradient contributions to $\Qb_{4,3}$ and $\Qb_{4,4}$, as they do not lead to significant confusion among the incorrect predictions:
\begin{align}
    \cE_{5}&=\Big\{y_0\neq y_1, g_{1}(y_{1})=g_{2}(y_{1}), g_{1}(y_{0})\neq g_{2}(y_{0})\Big\} ,
\label{eq-event5-non}\\
    \cE_{6}&=\Big\{y_0=y_1, g_1(y_1)= g_2(y_1)\Big\}. \label{eq-event6-non}
\end{align}
Here, $\cE_5$ occurs with probability $\Theta(1/{n_y})$, and $\cE_6$ occurs with probability $\Theta(1/{n^2_c})$. Together, the events $\cup_{i\in[6]}\cE_i$ form a partition of the entire sample space.

\subsection{Stage 1.2.1: Initial Growth of $\mathbf{Q}$}
We define the following notations:
\begin{align*}
    \epsilon^{L,\ell}_{\mathsf{attn}}\big(\Zb^{L,\ell-1}\big) &= 1 - \attn_{\ans,\ell-1 \to \pred,\ell}\big(\Zb^{L,\ell-1}\big) - \attn_{\ans,\ell-1 \to \ans,\ell-1}\big(\Zb^{L,\ell-1}\big),\\
\Delta^{L,\ell}\big(\Zb^{L,\ell-1}\big)& =  \Big|\attn_{\ans,\ell-1 \to \pred,\ell}\big(\Zb^{L,\ell-1}\big) - \attn_{\ans,\ell-1 \to \ans,\ell-1}\big(\Zb^{L,\ell-1}\big)\Big|. 
\end{align*}

We abbreviate $\epsilon^{L,\ell}_{\mathsf{attn}}\big(\Zb^{L,\ell-1}\big)$ and $ \Delta^{L,\ell}\big(\Zb^{L,\ell-1}\big)$ as $\epsilon^{L,\ell}_{\mathsf{attn}}$ and $\Delta^{L, \ell}$ for simplicity. Since we only focus on the input $\Zb^{2,1}$ in the following analysis, we will omit the notation related to the length, i.e., we use $\epsilon_{\mathsf{attn}}$ and $\Delta$ when the context is clear.

\begin{induction}\label{induction-s21-non}
  Given $s\in\tau(\X)$,  let $T_{1,2,1,s}$ denote the first time that $\E[\ate\mid \tau(x_1)=s]\leq 0.4$
     For all iterations $t\leq T_{1, 2,1,s}$, we have the following holds
     \begin{enumerate}[(a)]
        \item $\big[\Qb^{(t)}_{4,3}\big]_{s,s}+\big[\Qb^{(t)}_{4,4}\big]_{s,s}\leq O(1)$ monotonically increases; 
        \item for $p\in\{3,4\}$, for $s'\in\tau(\X)\not=s$, $\Big|\big[\Qb^{(t)}_{4,p}\big]_{s,s'}\Big|\leq O\bigg(\frac{\big[\Qb^{(t)}_{4,p}\big]_{s,s'}}{d}\bigg)$ ; otherwise $\big[\Qb^{(t)}_{4,p}\big]_{s,s'}=0$;
        \item for any sample $\Zb^{2,1}$, we have $\attn^{(t)}_{\ans,1\to \pred,2}-\attn^{(t)}_{\ans,1\to \pred,1}\geq -O\big(\frac{\log\log d}{\log d}\big)$; 
        \item for any sample $\Zb^{2,1}$, we have $\attn^{(t)}_{\ans,1\to \pred,2}-\attn^{(t)}_{\ans,1\to \ans,1}\leq c_1$ for some small constant $c_1>0$.
     \end{enumerate}
  \end{induction}
  \subsubsection{Attention and Lambda Preliminaries}
\begin{lemma}\label{lem-s21-attn-non}
    If \Cref{induction-s21-non} holds for all iterations $<t$,then we have   
         \begin{enumerate}
            \item $\attn^{(t)}_{\ans,1\to \pred, 1}+\attn^{(t)}_{\ans,1\to \ans, 1}\in [0.4\pm \tilde{O}\big(\frac{1}{d}\big), 0.5]$; 
   \item  $\big|\attn^{(t)}_{\ans,1\to \pred,1}- \attn^{(t)}_{\ans,1\to \ans,0}\big|\leq \tilde{O}\big(\frac{1}{d}\big)$. 
            \end{enumerate} 

\end{lemma}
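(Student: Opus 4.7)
The plan is to derive \Cref{lem-s21-attn-non} as a direct consequence of \Cref{induction-s21-non}(a),(c),(d) together with the softmax structure specified by \Cref{assump-Q-structure}. First I would write out the four attention scores in the standard form: conditioned on $\tau(x_1)=s$ and $\tau(x_0)=s'$, we have
\begin{align*}
    \attn^{(t)}_{\ans,1\to\pred,1} &= \frac{e^{[\Qb^{(t)}_{4,3}]_{s,s'}}}{Z_t}, & \attn^{(t)}_{\ans,1\to\pred,2} &= \frac{e^{[\Qb^{(t)}_{4,3}]_{s,s}}}{Z_t}, \\
    \attn^{(t)}_{\ans,1\to\ans,0} &= \frac{e^{[\Qb^{(t)}_{4,4}]_{s,s'}}}{Z_t}, & \attn^{(t)}_{\ans,1\to\ans,1} &= \frac{e^{[\Qb^{(t)}_{4,4}]_{s,s}}}{Z_t},
\end{align*}
where $Z_t$ is the common normalizer. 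By \Cref{induction-s21-non}(a) the two diagonal entries are bounded by $O(1)$, and by \Cref{induction-s21-non}(c) the off-diagonal entries are bounded by $\tilde{O}(1/d)$ times the diagonals, so $e^{[\Qb^{(t)}_{4,3}]_{s,s'}}=1\pm\tilde{O}(1/d)$ and likewise for $\Qb_{4,4}$.

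For the second claim, the identity $\attn^{(t)}_{\ans,1\to\pred,1}-\attn^{(t)}_{\ans,1\to\ans,0} = Z_t^{-1}\bigl(e^{[\Qb^{(t)}_{4,3}]_{s,s'}} - e^{[\Qb^{(t)}_{4,4}]_{s,s'}}\bigr)$, together with the Lipschitzness of $e^{x}$ on a bounded interval and \Cref{induction-s21-non}(c), immediately yields $|\attn^{(t)}_{\ans,1\to\pred,1}-\attn^{(t)}_{\ans,1\to\ans,0}|\le\tilde{O}(1/d)$ since both off-diagonal entries are themselves $\tilde{O}(1/d)$ and $Z_t=\Theta(1)$. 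For the first claim, I would combine the definition of the stopping time $T_{1,2,1,s}$ with monotonicity: since $t< T_{1,2,1,s}$, we have $\epsilon^{2,2}_{\mathsf{attn}}> 0.4$ along the trajectory in expectation, which after using \Cref{induction-s21-non}(c) to reduce every non-diagonal contribution by $\tilde{O}(1/d)$ translates to the sum $\attn^{(t)}_{\ans,1\to\pred,1}+\attn^{(t)}_{\ans,1\to\ans,1}\ge 0.4\pm\tilde{O}(1/d)$ (where the intended sum of ``non-target'' masses is recovered up to the small off-diagonal corrections controlled by (c) together with (d)). The upper bound $0.5$ follows from the fact that at $t=T_{2,2,s}$ (the start of stage 1.2.1) the attention is $1/4$ on each of the four positions and $[\Qb^{(t)}_{4,3}]_{s,s}+[\Qb^{(t)}_{4,4}]_{s,s}$ monotonically increases by (a), so the mass pulled toward the two target clauses is non-decreasing.

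The only subtle point will be keeping the off-diagonal corrections tracked cleanly so that the $\tilde{O}(1/d)$ error in the first claim is genuinely additive rather than multiplicative. I would handle this by expanding $Z_t = e^{[\Qb^{(t)}_{4,3}]_{s,s}} + e^{[\Qb^{(t)}_{4,4}]_{s,s}} + 2 \pm \tilde{O}(1/d)$ and performing a first-order expansion of each attention value around its ``off-diagonal-free'' approximation, after which both claims reduce to inequalities for two-term softmaxes. Aside from this bookkeeping, the lemma is essentially an algebraic unpacking of the induction hypothesis, and I expect no genuine analytic obstacle.
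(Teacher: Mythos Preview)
The paper states this lemma without proof, so there is no reference argument to compare against; your approach is the natural one and is essentially correct for claim~2 and the lower bound of claim~1.

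There is one small gap in your upper-bound argument for claim~1. You write that monotone increase of $[\Qb^{(t)}_{4,3}]_{s,s}+[\Qb^{(t)}_{4,4}]_{s,s}$ implies ``the mass pulled toward the two target clauses is non-decreasing.'' That implication is not true in general: if one diagonal rises while the other falls (their sum still increasing), $e^{a}+e^{b}$ can decrease. What you actually need is the weaker statement $e^{a}+e^{b}\ge 2$ whenever $a+b\ge 0$, which follows from AM--GM and the fact that the sum starts at $0$ and is monotone by \Cref{induction-s21-non}(a). After your expansion $Z_t = e^{[\Qb^{(t)}_{4,3}]_{s,s}} + e^{[\Qb^{(t)}_{4,4}]_{s,s}} + 2 \pm \tilde{O}(1/d)$, this gives $\attn^{(t)}_{\ans,1\to\pred,2}+\attn^{(t)}_{\ans,1\to\ans,1}\ge \tfrac{1}{2}-\tilde{O}(1/d)$ directly, hence $\epsilon_{\mathsf{attn}}\le 0.5$. (Also, your reference to ``$t=T_{2,2,s}$'' is a leftover from the simply-transitive section; in this stage the attention is initialized uniform at the beginning of stage~1.2, where $\Qb$ is still zero.)
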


\begin{lemma}\label{lem-s21-lambda-1-sym}
    If \Cref{induction-s21-non} holds for all iterations $<t$, then given $\Zb^{2,1}\in \cE_{1}$, 
\begin{enumerate}
    \item for the prediction $j_2$, 
    we have 
    \begin{align*}
       & \Lambda^{(t)}_{5,j_2,r_{g_2\cdot y_1}}=\\ & \Big(\attn^{(t)}_{\ans,1\to \pred,2}-\attn^{(t)}_{\ans,1\to \ans,0}\Big) \cdot 2B +\Big(\attn^{(t)}_{\ans,1\to \ans,1}-\attn^{(t)}_{\ans,1\to \pred,1}\Big) \cdot \frac{2B}{n_y}+ O(\delta).
    \end{align*}
    \item for the prediction $j'_2=\tau\big(g_2(y_0)\big)$, 
    we have 
\begin{align*}
    &\Lambda^{(t)}_{5,j'_2,r_{g_2\cdot y_0}}
     = \Big(\attn^{(t)}_{\ans,1\to \pred,2}-\attn^{(t)}_{\ans,1\to \ans,1}\Big) \cdot 2B +\tilde{O}\Big(\frac{B}{d\cdot n_y}\Big)+ O(\delta).
\end{align*}
\item for the prediction $\tau(g_1(y_0))$, 
we have
\begin{align*}
   & \Lambda^{(t)}_{5,\tau(g_1(y_0)),r_{g_1\cdot y_0}} =\\ &\Big(\attn^{(t)}_{\ans,1\to \pred,1}-\attn^{(t)}_{\ans,1\to \ans,1}\Big) \cdot 2B + \Big(\attn^{(t)}_{\ans,1\to \ans,0}-\attn^{(t)}_{\ans,1\to \pred,2}\Big) \cdot \frac{2B}{n_y}+ O(\delta).
\end{align*}
furthermore, we have $\Lambda^{(t)}_{5,\tau(g_1(y_0)),r_{g_1\cdot y_0}}\leq \Lambda^{(t)}_{5,\tau(g_2(y_1)),r_{g_2\cdot y_1}}$.
\item for the prediction $\tau(g_1(y_1))$, 
we have
\begin{align*}
    \Lambda^{(t)}_{5,\tau(g_1(y_1)),r_{g_1\cdot y_1}} =  \Big(\attn^{(t)}_{\ans,1\to \ans,1}-\attn^{(t)}_{\ans,1\to \pred,2}\Big) \cdot \frac{2B}{n_y}+\tilde{O}\Big(\frac{B}{d}\Big)+O(\delta).
\end{align*}

\item for other $j\in\tau(\cY)$, if there are $j$  and $y$, s.t.,  $g_1, g_2\in \fiber_{j,y}$ (notice that $y\neq y_0, y_1$ for $\cE_{1}$),  then for such a $j$, $r\in\hat{\fA}_{j}\setminus \{r_{g_2\cdot y}\}$ cannot be activated, moreover, we have 
\begin{align*}
    &\Lambda^{(t)}_{5,j,r_{g_2\cdot y}}
     = \Big(\attn^{(t)}_{\ans,1\to \pred,2}-\attn^{(t)}_{\ans,1\to \ans,1}\Big) \cdot 2B +\tilde{O}\Big(\frac{B}{d\cdot n_y}\Big)+ O(\delta);
\end{align*}
else,  none of $r\in\hat{\fA}_{j}$ can be activated.
\end{enumerate}
Notice that for $\cE_{1}$, the neurons mentioned in 1-4 should be different neurons, while the predictions in 1 and 3, or 2 and 4 can be the same in some cases, e.g., $g_1(y_1)=g_2(y_0)$. In all cases, except for the neurons mentioned above, i.e., $\cup_{\ell,\ell'\in [2]} \{r_{g_{\ell}\cdot y_{\ell'-1}}\}$ (which may not be activated), all other neurons $r\in  \cup_{\ell,\ell'\in [2]} \Big(\hat{\fA}_{\tau(g_{\ell}\cdot y_{\ell'-1})}\setminus  \{r_{g_{\ell}\cdot y_{\ell'-1}}\}\Big)$ cannot be activated.
\end{lemma}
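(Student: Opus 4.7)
The plan is to reduce every assertion to the generic pre-activation formula from \Cref{lem-lambda-char-sym}, namely
\[
\Lambda^{(t)}_{5,j,r}=\sum_{\ell'=1}^{2}\attn^{(t)}_{\ans,1\to\pred,\ell'}\,V_{j,r}(g_{\ell'})+\sum_{\ell'=1}^{2}\attn^{(t)}_{\ans,1\to\ans,\ell'-1}\,V_{j,r}(y_{\ell'-1})\pm\tilde O(\sigma_0),
\]
and then substitute the feature magnitudes delivered by the end of Stage~1.1 (\Cref{lem-prop-psi-sym} together with \Cref{thm:learning-symmetric-actions}). On the event $\cE_1$ the four pairs $(g_\ell,y_{\ell'-1})$ land in four genuinely distinct fibers, so the four ``target'' pre-activations in items (1)--(4) can be handled independently by identifying, for each neuron $r_{g\cdot y}$, which of $V_{j,r}(g_{\ell'}),V_{j,r}(y_{\ell'-1})$ is the matched large feature ($\approx 2B-2B/n_y$), which is the target value ($\approx 2B/n_y$), and which are the cancellation partners (whose values sum to $\approx 0$ with either the $V(g)$ or $V(y)$ of the target by \Cref{attn-init-prop-sym-2}--\Cref{attn-init-prop-sym-3}).

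For each of the four targets I would carry out the substitution, replace the two cancellation partners by $-V(g_{\text{target}})$ and $-V(y_{\text{target}})$ up to $O(\delta)$, and regroup to obtain expressions of the form $2B\cdot(\attn^{(t)}_{\pred,\cdot}-\attn^{(t)}_{\ans,\cdot})+(2B/n_y)(\text{difference})$. The dichotomy between the $2B$ and $2B/n_y$ scales is exactly the feature asymmetry $V(g)=C_\alpha V(y)$ from item~(C) of \Cref{thm:learning-symmetric-actions}, and the $\tilde O(B/(d\cdot n_y))$ residuals in items (2) and (5) will arise by combining the $2B/n_y$ coefficient with the induction bound $|\attn^{(t)}_{\pred,1}-\attn^{(t)}_{\ans,0}|\le \tilde O(1/d)$ from \Cref{lem-s21-attn-non}. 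The ordering claim $\Lambda^{(t)}_{5,\tau(g_1(y_0)),r_{g_1\cdot y_0}}\le\Lambda^{(t)}_{5,j_2,r_{g_2\cdot y_1}}$ then follows directly by subtracting the expressions in items (1) and (3) and invoking the induction-hypothesis bound $\attn^{(t)}_{\pred,2}-\attn^{(t)}_{\pred,1}\ge -O(\log\log d/\log d)$.

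To establish the non-activation statements (``$r\in\hat\fA_j\setminus\{r_{g\cdot y}\}$ is not activated''), I would check that for every such $r$ at least one of $V_{j,r}(g_{\ell'})$ or $V_{j,r}(y_{\ell'-1})$ that \emph{would} receive a positive attention weight is instead flipped to $\approx -2B$ by the cancellation relations, forcing $\Lambda<-\varrho$ once the attention bounds from \Cref{induction-s21-non} are applied. The case split in item~(5) tracks whether some $y\notin\{y_0,y_1\}$ has both $g_1,g_2\in\fiber_{j,y}$: if so, the neuron $r_{g_2\cdot y}$ sees $V(g_1)\approx V(g_2)\approx 2B$ and $V(y_0)\approx V(y_1)\approx -2B$, and after using $a_{\pred,1}\approx a_{\ans,0}$ the formula collapses to $(a_{\pred,2}-a_{\ans,1})\cdot 2B$ modulo the stated error; otherwise every candidate neuron has some flipped $V$-value and is killed by the $\mathbf{sReLU}$ negative plateau.

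The main obstacle is bookkeeping rather than any hard analytic step: for each of the several class/neuron pairings one must simultaneously (i) classify the four $V$-values into matched/target/cancellation/irrelevant, (ii) control the error at the declared scale ($O(\delta)$ for the leading $2B$ piece, $\tilde O(B/(d\cdot n_y))$ for the cross terms), and (iii) certify inactivity of all competitor neurons by producing a large negative activation. A uniform template (``positive slot times $2B$ minus its cancellation counterpart, plus target-value slot times $2B/n_y$ minus its cancellation counterpart'') should make each of the five cases a short algebraic calculation driven by \Cref{lem-prop-psi-sym} and \Cref{lem-s21-attn-non}.
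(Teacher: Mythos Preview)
Your proposal is correct and matches the paper's implicit approach: the paper states this lemma (along with the parallel lemmas for $\cE_2$--$\cE_6$) without proof, as a direct consequence of plugging the Stage~1.1 feature structure from \Cref{lem-prop-psi-sym} into the generic pre-activation formula \Cref{lem-lambda-char-sym}, together with the attention bounds of \Cref{lem-s21-attn-non}. Your template---identify for each neuron which $V$-entries are the matched $\approx 2B$ feature, the target $\approx 2B/n_y$ feature, and the cancellation partners, then regroup---is exactly the computation the paper is appealing to.
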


\begin{lemma}\label{lem-s21-lambda-2-sym}
    If \Cref{induction-s21-non} holds for all iterations $<t$, then given $\Zb^{2,1}\in \cE_{2}$, we have
\begin{enumerate}
    \item for the prediction $j_2$, $r\in\hat{\fA}_{j_2}\setminus \{r_{g_2\cdot y_1}\}$  cannot be activated, moreover, we have 
    \begin{align*}
        & \Lambda^{(t)}_{5,j_2,r_{g_2\cdot y_1}}=\Big(\attn^{(t)}_{\ans,1\to \pred,2}-\attn^{(t)}_{\ans,1\to \ans,0}\Big) \cdot 2B \\
        &~~~~~~~~+\Big(\attn^{(t)}_{\ans,1\to \ans,1}-\attn^{(t)}_{\ans,1\to \pred,1}\Big) \cdot \frac{2B}{n_y}+ O(\delta).
     \end{align*}
    \item for the prediction $j'_2=\tau\big(g_2(y_0)\big)$, $r\in\hat{\fA}_{j'_2}\setminus \{r_{g_2\cdot y_0}\}$ (notice that in this case $r_{g_2\cdot y_0}=r_{g_1\cdot y_0}$) cannot be activated, moreover, we have 
\begin{align*}
    \Lambda^{(t)}_{5,j'_2,r_{g_2\cdot y_0}} = \Big(\attn^{(t)}_{\ans,1\to \pred,2}+\attn^{(t)}_{\ans,1\to \pred,1}-\attn^{(t)}_{\ans,1\to \ans,1}\Big) \cdot 2B + O(\delta).
\end{align*}

\item for the prediction $\tau(g_1(y_1))$, $r\in\hat{\fA}_{\tau(g_1(y_1))}\setminus \{r_{g_1\cdot y_1}\}$ cannot be activated, moreover, we have
\begin{align*}
    \Lambda^{(t)}_{5,\tau(g_1(y_1)),r_{g_1\cdot y_1}} =  \Big(\attn^{(t)}_{\ans,1\to \ans,1}-\attn^{(t)}_{\ans,1\to \pred,2}\Big) \cdot \frac{2B}{n_y}+O(\delta).
\end{align*}
\item for other $j\in\tau(\cY)$, if there are  $j$  and $y$, s.t.,  $g_1, g_2\in \fiber_{j,y}$ (notice that $y\neq y_0, y_1$ for $\cE_{2}$),  then for such a $j$, $r\in\hat{\fA}_{j}\setminus \{r_{g_2\cdot y}\}$ cannot be activated, moreover, we have 
\begin{align*}
    &\Lambda^{(t)}_{5,j,r_{g_2\cdot y}}
     = \Big(\attn^{(t)}_{\ans,1\to \pred,2}-\attn^{(t)}_{\ans,1\to \ans,1}\Big) \cdot 2B +\tilde{O}\Big(\frac{B}{d\cdot n_y}\Big)+ O(\delta);
\end{align*}
else,  none of $r\in\hat{\fA}_{j}$ can be activated.
\end{enumerate}

\end{lemma}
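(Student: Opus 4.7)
The plan is to prove \Cref{lem-s21-lambda-2-sym} by direct computation of $\Lambda^{(t)}_{5,j,r}$ using \Cref{lem-lambda-char-sym} together with the feature structure guaranteed by \Cref{lem-prop-psi-sym}. Throughout, I would work on the event $\cE_2 = \{g_1, g_2 \in \fiber_{\tau(g_2(y_0)), y_0},\ g_1(y_1) \neq g_2(y_1)\}$. A preliminary observation is that $\cE_2$ forces $y_0 \neq y_1$: if $y_0 = y_1$, then $g_1(y_0) = g_2(y_0)$ would imply $g_1(y_1) = g_2(y_1)$, contradicting the second condition. In particular $g_1(y_0) = g_2(y_0) = \tau^{-1}(j'_2)$ while $g_1(y_1), g_2(y_1)$ are two distinct points.

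First, I would identify $\hat{\fA}_j$ for each relevant prediction by enumerating pairs $(g_{\ell'}, y_{\ell''-1})$ with $g_{\ell'} \in \fiber_{j, y_{\ell''-1}}$, i.e., $g_{\ell'}(y_{\ell''-1}) = \tau^{-1}(j)$. A short case check (using $y_0 \neq y_1$, $g_1(y_0) = g_2(y_0)$, and $g_1(y_1) \neq g_2(y_1)$) gives $\hat{\fA}_{j_2} = \{r_{g_2\cdot y_1}\}$, $\hat{\fA}_{j'_2} = \{r_{g_1\cdot y_0}\} = \{r_{g_2\cdot y_0}\}$, and $\hat{\fA}_{\tau(g_1(y_1))} = \{r_{g_1\cdot y_1}\}$; for any other $j\in\tau(\cY)$, a neuron $r_{g\cdot y} \in \hat{\fA}_j$ requires $y \notin \{y_0, y_1\}$, which lands us in case 4. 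Combined with \Cref{lem-non-activated-neuron-sym}, this reduces each assertion to computing $\Lambda^{(t)}_{5,j,r}$ only for the designated neurons.

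Second, for each such $(j,r)$ I would apply \Cref{lem-lambda-char-sym}(a) to expand
\begin{equation*}
\Lambda^{(t)}_{5,j,r} \;=\; \sum_{\ell'=1}^{2} \attn^{(t)}_{\ans,1\to\pred,\ell'}\, V_{j,r}(g_{\ell'}) \;+\; \sum_{\ell'=1}^{2} \attn^{(t)}_{\ans,1\to\ans,\ell'-1}\, V_{j,r}(y_{\ell'-1}) \;\pm\; \tilde{O}(\sigma_0),
\end{equation*}
and then substitute the four feature magnitudes using \Cref{lem-prop-psi-sym}: each $V_{j,r}(g)$ is either $\approx 2B$ if $g$ lies in the fiber that defines the neuron (by \eqref{attn-init-prop-sym-1}, together with the $C_\alpha = \Theta(n_y)$ ratio) or $\approx -2B/n_y$ otherwise (by the cancellation \eqref{attn-init-prop-sym-3}), and analogously each $V_{j,r}(y)$ is $\approx 2B/n_y$ or $\approx -2B$ via \eqref{attn-init-prop-sym-2}. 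Plugging these in and using $|\attn^{(t)}_{\ans,1\to\pred,1} - \attn^{(t)}_{\ans,1\to\ans,0}| \leq \tilde{O}(1/d)$ from \Cref{lem-s21-attn-non} to absorb mismatched cross-terms into the stated $O(\delta)$ (or $\tilde{O}(B/(d n_y))$) error, each of the four formulas falls out directly. For the "cannot be activated'' claims about $r\in\hat{\fA}_j\setminus\{r_{\mathrm{designated}}\}$, I would verify $\Lambda < -\varrho$: every such neuron has at least one of its "correct'' tokens absent from $\Zb^{2,1}$, so the dominant contribution is $-\Omega(B)$, and the $O(1)$ attention-gap bound from \Cref{induction-s21-non}(d) is insufficient to flip this sign.

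The main obstacle is the bookkeeping in case 4: I need to enumerate which triples $(j, y, g)$ with $y\notin\{y_0,y_1\}$ and $g\in\{g_1,g_2\}$ satisfy $g \in \fiber_{j,y}$, and handle the subcase where only one of $g_1, g_2$ lies in $\fiber_{j,y}$ (in which case the corresponding neuron's $\Lambda$ has a substantial $-2B/n_y$ contribution from the other group element and is not activated). Because on $\cE_2$ the identity $g_1(y_0) = g_2(y_0)$ tightly couples the orbits of $g_1, g_2$, this case analysis collapses to a handful of scenarios, so the argument remains routine modulo careful sign tracking; the formula in case 4 for the "if'' branch is obtained by exactly the same substitution as in cases 1-3, now with both $V_{j,r}(g_1), V_{j,r}(g_2) \approx 2B$ (both correct) and both $V_{j,r}(y_0), V_{j,r}(y_1) \approx -2B$ (both wrong), after which $|\attn^{(t)}_{\ans,1\to\pred,1} - \attn^{(t)}_{\ans,1\to\ans,0}| \leq \tilde{O}(1/d)$ collapses the result to $(\attn^{(t)}_{\ans,1\to\pred,2}-\attn^{(t)}_{\ans,1\to\ans,1})\cdot 2B$ up to the stated errors.
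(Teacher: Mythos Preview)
Your approach is correct and matches what the paper (implicitly) does: these $\Lambda$-lemmas are stated without proof in the paper and are meant to follow directly from \Cref{lem-lambda-char-sym} and the feature structure in \Cref{lem-prop-psi-sym}, exactly as you outline.

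One correction: your enumeration of $\hat{\fA}_j$ is too small. By definition, $\hat{\fA}_j$ contains every neuron $r_{j,y}$ with $y\in\hat{\cY}=\{y_0,y_1\}$ \emph{or} with some $g\in\hat{\cG}=\{g_1,g_2\}$ lying in $\fiber_{j,y}$; the disjunction means membership in $\hat{\cY}$ alone already puts a neuron in $\hat{\fA}_j$. For instance, $r_{j_2,y_0}\in\hat{\fA}_{j_2}$ simply because $y_0\in\hat{\cY}$, even though neither $g_1$ nor $g_2$ sends $y_0$ to $\tau^{-1}(j_2)$; likewise $r_{j_2,y'}\in\hat{\fA}_{j_2}$ for the unique $y'$ with $g_1(y')=g_2(y_1)$. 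So $\hat{\fA}_{j_2}$ is not the singleton $\{r_{g_2\cdot y_1}\}$ you claim. This does not break your argument, because you later (correctly) verify non-activation for $r\in\hat{\fA}_j\setminus\{r_{\text{designated}}\}$ by computing $\Lambda\le -\Omega(B)$; you just need to actually carry out that check for these additional neurons rather than treating the ``cannot be activated'' assertions as vacuous.
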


\begin{lemma}\label{lem-s21-lambda-3-sym}
    If \Cref{induction-s21-non} holds for all iterations $<t$, then given $\Zb^{2,1}\in \cE_{3}$, we have
\begin{enumerate}
    \item for the prediction $j_2$, $r\in\hat{\fA}_{j_2}\setminus \{r_{g_2\cdot y_1}\}$  cannot be activated, moreover, we have 
    \begin{align*}
        \Lambda^{(t)}_{5,j_2,r_{g_2\cdot y_1}} = \attn^{(t)}_{\ans,1\to \pred,2}\cdot 2B \pm  O(\delta).
    \end{align*}

\item for the prediction $\tau(g_1(y_1))$, $r\in\hat{\fA}_{\tau(g_1(y_1))}\setminus \{r_{g_1\cdot y_1}\}$ cannot be activated, moreover, we have
\begin{align*}
    \Lambda^{(t)}_{5,\tau(g_1(y_1)),r_{g_1\cdot y_1}} = \attn^{(t)}_{\ans,1\to \pred,1}\cdot 2B \pm  O(\delta).
\end{align*}
\item for other  $j=g(y_1)$, where $g_1,g_2\notin\fiber_{j,y_1}$, we have 
\begin{align*}
    &\Lambda^{(t)}_{5,j,r_{g\cdot y_1}}
     = \Big(\attn^{(t)}_{\ans,1\to \ans,1}-\attn^{(t)}_{\ans,1\to \pred,2}\Big) \cdot \frac{2B}{n_y} +\tilde{O}\Big(\frac{B}{d\cdot n_y}\Big)+ O(\delta);
\end{align*}
moreover, if there exists  $y$, s.t.,  $g_1, g_2\in \fiber_{j,y}$ (notice that $y\neq y_1$ for $\cE_{3}$),  we have 
\begin{align*}
    &\Lambda^{(t)}_{5,j,r_{g_2\cdot y}}
     = \Big(\attn^{(t)}_{\ans,1\to \pred,2}-\attn^{(t)}_{\ans,1\to \ans,1}\Big) \cdot 2B +\tilde{O}\Big(\frac{B}{d\cdot n_y}\Big)+ O(\delta);
\end{align*}
besides,  other $r\in\hat{\fA}_{j}$ cannot be activated.
\end{enumerate}
\end{lemma}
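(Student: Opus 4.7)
The proof follows the same template as \Cref{lem-s21-lambda-1-sym} and \Cref{lem-s21-lambda-2-sym}, specialized to the event $\cE_{3}=\{y_0=y_1,\ g_1(y_1)\neq g_2(y_1)\}$. The key simplification on $\cE_{3}$ is that $y_0=y_1$, so the two value-token attention weights collapse onto a single $V(y_1)$ term, producing attention combinations qualitatively different from the $\cE_{1}$ and $\cE_{2}$ analyses.

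First I would expand $\Lambda^{(t)}_{5,j,r}$ via \Cref{lem-lambda-char-sym}, obtaining
\begin{align*}
\Lambda^{(t)}_{5,j,r}=\sum_{\ell'=1}^{2}\attn^{(t)}_{\ans,1\to\pred,\ell'}\,V_{j,r}(g_{\ell'})+\bigl(\attn^{(t)}_{\ans,1\to\ans,0}+\attn^{(t)}_{\ans,1\to\ans,1}\bigr)V_{j,r}(y_1)\pm\tilde O(\sigma_0),
\end{align*}
since $V_{j,r}(y_0)=V_{j,r}(y_1)$ on $\cE_{3}$. For each of the three regimes---the target $j=j_2$, the confounder $j=\tau(g_1(y_1))$, and the remaining classes $j=\tau(g(y_1))$---I would identify which of $g_1,g_2$ lies in the relevant fiber $\fiber_{j,y_1}$, and substitute the approximate feature magnitudes from \Cref{lem-prop-psi-sym}: a group element in the fiber contributes $V(g)\approx 2B$, one outside gives $V(g)\approx -2B/n_y$ by the cancellation in \eqref{attn-init-prop-sym-3}, and the value coefficient is $V(y_1)\approx 2B/n_y$ on neurons $r_{j,y_1}$ while $V(y_1)\approx -2B$ on neurons $r_{g_2\cdot y}$ with $y\neq y_1$. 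Crucially, the hypothesis $g_1(y_1)\neq g_2(y_1)$ forces $g_1\notin\fiber_{j_2,y_1}$ and $g_2\notin\fiber_{\tau(g_1(y_1)),y_1}$, which drives the off-fiber cancellations in cases 1 and 2.

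Combining these contributions and using \Cref{lem-s21-attn-non}---in particular $|\attn^{(t)}_{\ans,1\to\pred,1}-\attn^{(t)}_{\ans,1\to\ans,0}|\le\tilde O(1/d)$---the mixed $\frac{2B}{n_y}$-order cross terms collapse into the claimed $\tilde O(B/(dn_y))$ remainder, while the residual $\tilde O(\sigma_0)$ errors and the Stage 1 training slack are absorbed into $O(\delta)$. This yields the three closed-form expressions in the statement.

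The last and most delicate step is to rule out activation of the unclaimed neurons $r\in\hat\fA_j\setminus\{r_{\mathrm{claimed}}\}$, i.e., to show $\Lambda^{(t)}_{5,j,r}\ll-\varrho$. For such an $r=r_{g'\cdot y'}$ with $(g',y')$ incompatible with both $(g_1,y_1)$ and $(g_2,y_1)$, at least one of the cancellation identities $V(g_\ell)\approx-2B/n_y$ or $V(y_1)\approx-2B$ applies in the correct slot, producing a $-\Omega(B)$ dominant term once the attention mass on that slot is $\Omega(1/n_y)$; this is guaranteed by parts (c) and (d) of \Cref{induction-s21-non}, which keep the four attention weights comparable up to small constants. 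The main obstacle is the combinatorial case split over $(g',y')$, especially when distinct values $y$ admit $\{g_1,g_2\}\subseteq\fiber_{j,y}$; in each such subcase the sign of the dominant $-\Omega(B)$ contribution follows from whichever mismatch (value or group element) is present, which closes the non-activation claim.
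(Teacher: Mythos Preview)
Your proposal is correct and follows the natural approach. The paper does not give an explicit proof of this lemma: it is one of a parallel series (\Cref{lem-s21-lambda-1-sym}--\Cref{lem-s21-lambda-6-sym}) stated without proof, and the implicit expectation is exactly the template you describe---expand $\Lambda^{(t)}_{5,j,r}$ via \Cref{lem-lambda-char-sym}, substitute the feature magnitudes from \Cref{lem-prop-psi-sym}, and simplify using $y_0=y_1$ together with \Cref{lem-s21-attn-non}. Your identification of which fiber each $g_\ell$ belongs to, and the resulting sign pattern of the $V$-terms, is right; the non-activation argument for the unclaimed neurons is also the intended one.

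One small remark: in parts 1 and 2 your computation will in fact produce an additional term of order $\frac{2B}{n_y}\cdot\big(\attn_{\ans,0}+\attn_{\ans,1}-\attn_{\pred,1}-\attn_{\pred,2}\big)$, which is $O(B/n_y)$ rather than $O(\delta)$. The paper suppresses this residual in its stated formula (compare with \Cref{lem-s22-lambda-1-sym}, where the analogous $O(B/n_y)$ term is kept explicitly). This is a notational looseness in the paper, not an error in your method: the downstream uses of the lemma (e.g.\ in \Cref{lem-s21-gd1-non}) rely on differences $\Lambda_{5,j_2}-\Lambda_{5,\tau(g_1(y_1))}$, in which the symmetric $O(B/n_y)$ corrections cancel exactly.
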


\begin{lemma}\label{lem-s21-lambda-4-sym}
    If \Cref{induction-s21-non} holds for all iterations $<t$, then given $\Zb^{2,1}\in \cE_{4}$, we have
\begin{enumerate}
    \item for the prediction $j_2$, $r\in\hat{\fA}_{j_2}\setminus \{r_{g_2\cdot y_1}\}$  cannot be activated, moreover, we have 
    \begin{align*}
        \Lambda^{(t)}_{5,j_2,r_{g_2\cdot y_1}} = \attn^{(t)}_{\ans,1\to \pred,2} \cdot 2B + O(\delta). 
    \end{align*}

\item for the prediction $\tau(g_2(y_0))$, $r\in\hat{\fA}_{\tau(g_2(y_0))}\setminus \{r_{g_2\cdot y_0}\}$ cannot be activated, moreover, we have
\begin{align*}
    \Lambda^{(t)}_{5,j'_2,r_{g_2\cdot y_0}} = \Big(\attn^{(t)}_{\ans,1\to \pred,2}+\attn^{(t)}_{\ans,1\to \pred,1}-\attn^{(t)}_{\ans,1\to \ans,1}\Big) \cdot 2B + O(\delta).
\end{align*}

\item for other $j\in\tau(\cY)$, if there exist  $j$  and $y$, s.t.,  $g_1, g_2\in \fiber_{j,y}$,  then for such a $j$, $r\in\hat{\fA}_{j}\setminus \{r_{g_2\cdot y}\}$ cannot be activated, moreover, we have 
\begin{align*}
    &\Lambda^{(t)}_{5,j,r_{g_2\cdot y}}
     = \Big(\attn^{(t)}_{\ans,1\to \pred,2}-\attn^{(t)}_{\ans,1\to \ans,1}\Big) \cdot 2B +\tilde{O}\Big(\frac{B}{d\cdot n_y}\Big)+ O(\delta);
\end{align*}
else,  none of $r\in\hat{\fA}_{j}$ can be activated.
\end{enumerate}
\end{lemma}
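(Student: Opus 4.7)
The proof will follow the same template as the analogous computations for $\cE_1,\cE_2,\cE_3$ in the preceding three lemmas. My plan is to apply the generic decomposition of \Cref{lem-lambda-char-sym}
\begin{align*}
\Lambda_{5,j,r}=\sum_{\ell'=1}^{2}\attn^{(t)}_{\ans,1\to\pred,\ell'}V_{j,r}(g_{\ell'})+\sum_{\ell'=1}^{2}\attn^{(t)}_{\ans,1\to\ans,\ell'-1}V_{j,r}(y_{\ell'-1})\pm\tilde O(\sigma_0),
\end{align*}
substitute the end-of-Stage-1.1 feature magnitudes guaranteed by \Cref{thm:learning-symmetric-actions} and \Cref{lem-prop-psi-sym}, and cancel the near-equal pair $\attn^{(t)}_{\ans,1\to\pred,1}\approx\attn^{(t)}_{\ans,1\to\ans,0}$ (to within $\tilde O(1/d)$) via \Cref{lem-s21-attn-non}. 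The combinatorial content that distinguishes $\cE_4$ from the other events is that $g_1(y_0)=g_2(y_0)$ and $g_1(y_1)=g_2(y_1)$ together force both $g_1,g_2\in\fiber_{j_2,y_1}$ and both $g_1,g_2\in\fiber_{\tau(g_2(y_0)),y_0}$, so at the neurons $r_{g_2\cdot y_1}$ and $r_{g_2\cdot y_0}$ \emph{both} $g$-features resonate at $+2B$ instead of having one near $-2B/n_y$.

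For Statement 1, I fix $j=j_2$ and $r=r_{g_2\cdot y_1}$; the feature shape yields $V(g_1),V(g_2)\approx 2B$, $V(y_1)\approx 2B/n_y$, and $V(y_0)\approx -2B$, so the decomposition collapses to $2B(\attn^{(t)}_{\ans,1\to\pred,1}-\attn^{(t)}_{\ans,1\to\ans,0})+2B\,\attn^{(t)}_{\ans,1\to\pred,2}$ plus an $O(B/n_y)$ correction, and the first bracket is $\tilde O(1/d)$, producing the stated formula once the $B/n_y$ piece is folded into the error tolerance (matching how the same term is absorbed in Statement 1 of \Cref{lem-s21-lambda-2-sym}). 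For any other $r\in\hat\fA_{j_2}\setminus\{r_{g_2\cdot y_1}\}$, the neuron is keyed to some $y\neq y_1$, whence $V(y_0)\approx -2B$ (or $+2B/n_y$ if $y=y_0$) while \eqref{attn-init-prop-sym-2}--\eqref{attn-init-prop-sym-3} bound $V(g_1),V(g_2)$ by $2B/n_y$; combined with \Cref{induction-s21-non}, this drives $\Lambda\ll-\varrho$ and certifies non-activation.

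Statement 2 is the mirror computation at $(j'_2,r_{g_2\cdot y_0})$: now $V(g_1),V(g_2)\approx 2B$, $V(y_0)\approx 2B/n_y$, $V(y_1)\approx -2B$, and the cancellation leaves $\bigl(\attn^{(t)}_{\ans,1\to\pred,1}+\attn^{(t)}_{\ans,1\to\pred,2}-\attn^{(t)}_{\ans,1\to\ans,1}\bigr)\cdot 2B+O(\delta)$; unlike Statement 1, one cannot drop $\attn^{(t)}_{\ans,1\to\pred,1}$ because its natural partner $\attn^{(t)}_{\ans,1\to\ans,0}$ multiplies the small $V(y_0)\approx 2B/n_y$ rather than a $-2B$. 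Statement 3 reduces to a case split on whether there exists $y\in\cY\setminus\{y_0,y_1\}$ with $g_1,g_2\in\fiber_{j,y}$: if so (such a $y$ is unique by the stabilizer structure of the symmetric group), the same $2B$--$2B$--$(-2B)$--$(2B/n_y)$ pattern at $r_{g_2\cdot y}$ gives the claimed formula; if not, every candidate neuron in $\hat\fA_j$ carries at least one of $V(g_1),V(g_2)$ near $-2B/n_y$ plus a dominant $-2B$ contribution from some $V(y_{\ell-1})$, again pushing $\Lambda$ below $-\varrho$.

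The main technical obstacle is the enumeration in Statement 3: one must verify that within the symmetric-group action the simultaneous stabilization $g_1^{-1}g_2\in G_{y_0}\cap G_{y_1}$ admits at most one common-fiber value $y\notin\{y_0,y_1\}$ per remaining class $j$, and that the bound on $\attn^{(t)}_{\ans,1\to\pred,2}+\attn^{(t)}_{\ans,1\to\ans,1}$ from \Cref{induction-s21-non} is tight enough to suppress all spurious activations both on the common-fiber branch and on its complement. Once this bookkeeping is in place, the linear-in-attention arithmetic is mechanical and parallels \Cref{lem-s21-lambda-1-sym}--\Cref{lem-s21-lambda-3-sym} step by step.
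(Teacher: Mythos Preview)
Your proposal is correct and follows precisely the template the paper uses for the companion lemmas (the paper states \Cref{lem-s21-lambda-4-sym} without proof, relying on the same decomposition--substitution--cancellation pattern you outline from \Cref{lem-lambda-char-sym}, \Cref{lem-prop-psi-sym}, and \Cref{lem-s21-attn-non}). Your identification of the key combinatorial feature of $\cE_4$---that $g_1^{-1}g_2$ fixes both $y_0$ and $y_1$, so both $g$-features resonate at $+2B$ at the relevant neurons---is exactly what distinguishes this case, and your handling of the residual $O(B/n_y)$ term matches the paper's convention in the analogous statements.
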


\begin{lemma}\label{lem-s21-lambda-5-sym}
    If \Cref{induction-s21-non} holds for all iterations $<t$, then given $\Zb^{2,1}\in \cE_{5}$, we have
\begin{enumerate}
    \item for the prediction $j_2$, $r\in\hat{\fA}_{j_2}\setminus \{r_{g_2\cdot y_1}\}$  cannot be activated, moreover, we have 
    \begin{align*}
        \Lambda^{(t)}_{5,j_2,r_{g_2\cdot y_1}} = \attn^{(t)}_{\ans,1\to \pred,2}\cdot 2B \pm  O(\delta).
    \end{align*}
\item for the prediction $\tau(g_2(y_0))$, $r\in\hat{\fA}_{\tau(g_2(y_0))}\setminus \{r_{g_2\cdot y_0}\}$ cannot be activated, moreover, we have
\begin{align*}
    \Lambda^{(t)}_{5,j'_2,r_{g_2\cdot y_0}} = \Big(\attn^{(t)}_{\ans,1\to \pred,2}-\attn^{(t)}_{\ans,1\to \ans,1}\Big) \cdot 2B + \tilde{O}\Big(\frac{B}{d\cdot n_y}\Big)+ O(\delta).
\end{align*}
\item for the prediction $\tau(g_1(y_0))$, $r\in\hat{\fA}_{\tau(g_2(y_0))}\setminus \{r_{g_1\cdot y_0}\}$ cannot be activated, moreover, we have
\begin{align*}
   & \Lambda^{(t)}_{5,\tau(g_1(y_0)),r_{g_1\cdot y_0}} =\Big(\attn^{(t)}_{\ans,1\to \pred,1}-\attn^{(t)}_{\ans,1\to \ans,1}\Big) \cdot 2B \\ &+ \Big(\attn^{(t)}_{\ans,1\to \ans,0}-\attn^{(t)}_{\ans,1\to \pred,2}\Big) \cdot \frac{2B}{n_y}+ O(\delta).
\end{align*}
\item for other $j\in\tau(\cY)$, if there are  $j$  and $y$, s.t.,  $g_1, g_2\in \fiber_{j,y}$,  then for such a $j$, $r\in\hat{\fA}_{j}\setminus \{r_{g_2\cdot y}\}$ cannot be activated, moreover, we have 
\begin{align*}
    &\Lambda^{(t)}_{5,j,r_{g_2\cdot y}}
     = \Big(\attn^{(t)}_{\ans,1\to \pred,2}-\attn^{(t)}_{\ans,1\to \ans,1}\Big) \cdot 2B +\tilde{O}\Big(\frac{B}{d\cdot n_y}\Big)+ O(\delta);
\end{align*}
else,  none of $r\in\hat{\fA}_{j}$ can be activated.
\end{enumerate}
\end{lemma}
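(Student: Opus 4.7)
The plan is to reduce every case in the lemma to the universal pre-activation decomposition of \Cref{lem-lambda-char-sym}, specialized to the event $\cE_5$, and then read off each $\Lambda^{(t)}_{5,j,r}$ by substituting the approximate feature magnitudes from \Cref{lem-prop-psi-sym}. Under $\cE_5$ the key algebraic facts are: $g_1,g_2\in \fiber_{j_2,y_1}$ (since $g_1(y_1)=g_2(y_1)$), $g_1\notin \fiber_{\tau(g_2(y_0)),y_0}$ (since $g_1(y_0)\neq g_2(y_0)$), and $y_0\neq y_1$. These three relations dictate, for each target class $j$, which of $V_{j,r}(g_1),V_{j,r}(g_2),V_{j,r}(y_0),V_{j,r}(y_1)$ is positive of magnitude $\approx 2B$, which is positive of magnitude $\approx 2B/n_y$, and which is negative, via the feature shapes established in \Cref{thm:learning-symmetric-actions} and summarized in \Cref{lem-prop-psi-sym}.

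I would treat the four claims in turn. For $j=j_2=\tau(g_2(y_1))$, the unique activated neuron is $r_{g_2\cdot y_1}=r_{g_1\cdot y_1}$: both $V_{j_2,r}(g_\ell)\approx 2B$ (since both $g_\ell$ are in $\fiber_{j_2,y_1}$), $V_{j_2,r}(y_1)\approx 2B/n_y$, and $V_{j_2,r}(y_0)\approx -2B$ (cancellation for $y_0\neq y_1$); plugging into \Cref{lem-lambda-char-sym}, the $2B$ terms collapse via $\attn_{\ans,1\to\pred,1}+\attn_{\ans,1\to\pred,2}-\attn_{\ans,1\to\ans,0}=\attn_{\ans,1\to\pred,2}+\tilde O(1/d)$ using the induction-hypothesis attention-balance bound in \Cref{lem-s21-attn-non}(2), and the small $2B/n_y$ contributions absorb into $O(\delta)$. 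The case $j'_2=\tau(g_2(y_0))$ is handled the same way, now with $g_1\notin \fiber_{j'_2,y_0}$ so $V_{j'_2,r_{g_2\cdot y_0}}(g_1)\approx -2B/n_y$ while $V_{j'_2,r_{g_2\cdot y_0}}(y_0)\approx 2B/n_y$ and $V_{j'_2,r_{g_2\cdot y_0}}(y_1)\approx -2B$; the $\attn_{\ans,1\to\pred,1}$ and $\attn_{\ans,1\to\ans,0}$ terms merge into the $\tilde O(B/(d n_y))$ remainder. For $\tau(g_1(y_0))$, the activated neuron's feature profile has $g_1$ in the fiber but $g_2$ not, symmetric to the previous case with the roles of $g_1\leftrightarrow g_2$ and $\pred,1\leftrightarrow \pred,2$ swapped, yielding the stated mixture of $2B$ and $2B/n_y$ coefficients.

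For the ``other $j$'' category, I would partition classes by whether the orbit relation $g_1,g_2\in \fiber_{j,y}$ holds for some $y$: since $g_1\ne g_2$ on generic inputs, such $y$ is unique when it exists and must satisfy $y\notin\{y_0,y_1\}$ under $\cE_5$. If such $y$ exists, the same substitution as in case~1 (with the target value now $y$ rather than $y_1$) gives $V_{j,r_{g_2\cdot y}}(g_\ell)\approx 2B$ and $V_{j,r_{g_2\cdot y}}(y_0),V_{j,r_{g_2\cdot y}}(y_1)\approx -2B$, leading to the claimed $(\attn_{\ans,1\to\pred,2}-\attn_{\ans,1\to\ans,1})\cdot 2B$ formula after the symmetric attention cancellations. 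If no such $y$ exists, then for every $r_{g\cdot y}\in\hat\fA_j$ at most one of $V_{j,r}(g_1),V_{j,r}(g_2)$ is positive while at least one of $V_{j,r}(y_0),V_{j,r}(y_1)$ is $\approx -2B$; combined with the attention-score upper bounds (\Cref{induction-s21-non}(d)) this forces $\Lambda_{5,j,r}\ll-\varrho$, so the neuron cannot be activated.

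The main obstacle is the bookkeeping in the ``other $j$'' claim: one must enumerate exactly when a fiber $\fiber_{j,y}$ simultaneously contains both $g_1$ and $g_2$, and verify (using the group-theoretic fact that this is equivalent to $g_2^{-1}g_1$ lying in the stabilizer $G_y$) that the resulting $y$ is unique and disjoint from $\{y_0,y_1\}$ under $\cE_5$. Once this classification is nailed down, the remaining argument is a mechanical substitution governed by \Cref{induction-s21-non}(c)–(d) and \Cref{lem-s21-attn-non}, which absorb all discrepancies between $\attn_{\ans,1\to\pred,1}$ and $\attn_{\ans,1\to\ans,0}$ into the $\tilde O(B/d)$ error, and all $V$-errors into the $O(\delta)$ term. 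No gradient-flow argument is required here because \Cref{induction-s21-non} already provides the needed inequalities on the attention scores.
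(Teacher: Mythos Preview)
Your approach is correct and matches the paper's (implicit) method: the paper states \Cref{lem-s21-lambda-5-sym} without proof, but its analogous lemmas for the other events $\cE_1$--$\cE_6$ make clear that the intended argument is exactly the one you outline---plug the event-specific fiber memberships into the $\Lambda$ decomposition of \Cref{lem-lambda-char-sym}, substitute the feature magnitudes from \Cref{lem-prop-psi-sym}, and simplify using the attention near-equalities of \Cref{lem-s21-attn-non}.

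Two small points of care. First, your remark that ``the small $2B/n_y$ contributions absorb into $O(\delta)$'' is not literally correct: a term of size $(\attn_{\ans,1\to\ans,1}-\attn_{\ans,1\to\pred,2})\cdot 2B/n_y$ survives in case~1, and $B/n_y\gg\delta$. The paper's stated $\pm O(\delta)$ is equally imprecise here; compare with \Cref{lem-s21-lambda-1-sym}(1), where the $2B/n_y$ term is kept explicitly. This does not affect the downstream use of the lemma (only the leading $\Theta(B)$ scale matters for the logit bounds in \Cref{lem-s21-gd1-non}), but you should track it as $O(B/n_y)$ rather than $O(\delta)$. Second, your uniqueness claim for $y$ in case~4 needs rephrasing: the set $\{y:g_1(y)=g_2(y)\}$ is the fixed-point set of $g_2^{-1}g_1$, which can have several elements (indeed $y_1$ is one of them under $\cE_5$). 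What is true, and what the lemma needs, is that for each fixed $j$ the value $y=g_2^{-1}(\tau^{-1}(j))$ is unique, and that for $j$ outside $\{j_2,\tau(g_2(y_0)),\tau(g_1(y_0))\}$ this $y$ automatically avoids $\{y_0,y_1\}$.
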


\begin{lemma}\label{lem-s21-lambda-6-sym}
    If \Cref{induction-s21-non} holds for all iterations $<t$, then given $\Zb^{2,1}\in \cE_{6}$, we have
\begin{enumerate}
    \item for the prediction $j_2$, $r\in\hat{\fA}_{j_2}\setminus \{r_{g_2\cdot y_1}\}$  cannot be activated, moreover, we have 
    \begin{align*}
        \Lambda^{(t)}_{5,j_2,r_{g_2\cdot y_1}} = \Big(\attn^{(t)}_{\ans,1\to \pred,2}+\attn^{(t)}_{\ans,1\to \pred,1}\Big)\cdot 2B \pm  O(\delta).
    \end{align*}

\item  for other  $j=g(y_1)$, where $g_2\notin\fiber_{j,y_1}$, we have 
\begin{align*}
    &\Lambda^{(t)}_{5,j,r_{g\cdot y_1}}
     = \Big(\attn^{(t)}_{\ans,1\to \ans,1}-\attn^{(t)}_{\ans,1\to \pred,2}\Big) \cdot \frac{2B}{n_y} +\tilde{O}\Big(\frac{B}{d\cdot n_y}\Big)+ O(\delta);
\end{align*}
moreover, if there exists  $y\neq y_1$, s.t.,  $g_1, g_2\in \fiber_{j,y}$,  we have 
\begin{align*}
    &\Lambda^{(t)}_{5,j,r_{g_2\cdot y}}
     = \Big(\attn^{(t)}_{\ans,1\to \pred,2}-\attn^{(t)}_{\ans,1\to \ans,1}\Big) \cdot 2B +\tilde{O}\Big(\frac{B}{d\cdot n_y}\Big)+ O(\delta);
\end{align*}
besides,  other $r\in\hat{\fA}_{j}$ cannot be activated.
\end{enumerate}

\end{lemma}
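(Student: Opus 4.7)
The plan is to prove Lemma \ref{lem-s21-lambda-6-sym} by directly expanding $\Lambda_{5,j,r}$ via Lemma \ref{lem-lambda-char-sym} and substituting the feature magnitudes guaranteed by Lemma \ref{lem-prop-psi-sym}, exactly as was done for the events $\cE_1,\ldots,\cE_5$ in the preceding five lemmas. The only new ingredient is specializing the substitution to $\cE_6=\{y_0=y_1,\ g_1(y_1)=g_2(y_1)\}$, which equivalently means $g_1,g_2\in \fiber_{j_2,y_1}$ and $y_0=y_1$; in particular $r_{g_1\cdot y_1}=r_{g_2\cdot y_1}$, so a single neuron absorbs both group actions and a single value token appears in two slots of the clause embedding.

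For part (1), I would apply Lemma \ref{lem-lambda-char-sym}(a) to the activated neuron $r:=r_{g_2\cdot y_1}$ and use $y_0=y_1$ together with $V_{j_2,r}(g_1)=V_{j_2,r}(g_2)\pm O(\delta)$ (both $g_1,g_2$ share the same fiber, so \eqref{attn-init-prop-sym-1} gives a common value $\approx 2B$) to obtain
\[
\Lambda_{5,j_2,r}=(\attn_{\ans,1\to\pred,1}+\attn_{\ans,1\to\pred,2})\cdot 2B + (\attn_{\ans,1\to\ans,0}+\attn_{\ans,1\to\ans,1})\cdot V_{j_2,r}(y_1)\pm O(\delta).
\]
Since $V_{j_2,r}(y_1)$ is only of order $2B/n_y$ by \eqref{attn-init-prop-sym-1}, that term is absorbed into the $O(\delta)$ slack stated in the conclusion, matching the convention used in Lemmas \ref{lem-s21-lambda-3-sym}--\ref{lem-s21-lambda-5-sym}. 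Ruling out other $r'\in \hat{\fA}_{j_2}\setminus\{r\}$ uses \eqref{attn-init-prop-sym-2}--\eqref{attn-init-prop-sym-3}: for any $r'=r_{j_2,y'}$ with $y'\neq y_1$, one has $V_{j_2,r'}(y_1)\approx -2B$, which drives $\Lambda_{5,j_2,r'}$ well below $-\varrho$ once the attention mass on an answer slot is of constant order (Lemma \ref{lem-s21-attn-non}).

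Part (2) branches into two neuron types inside $\hat{\fA}_j$ for a class $j=\tau(g(y_1))$ with $g_2\notin \fiber_{j,y_1}$, which under $\cE_6$ also forces $g_1\notin \fiber_{j,y_1}$. First, for the neuron $r_{g\cdot y_1}$ with $g\in \fiber_{j,y_1}$, equation \eqref{attn-init-prop-sym-3} gives $V_{j,r}(g_1),V_{j,r}(g_2)\approx -V_{j,r}(y_1)\approx -2B/n_y$, while $V_{j,r}(y_0)=V_{j,r}(y_1)\approx 2B/n_y$; substituting, using $\sum_{\kk}\attn_{\ans,1\to\kk}=1$, and the near-equality $\attn_{\ans,1\to\pred,1}\approx \attn_{\ans,1\to\ans,0}$ from Lemma \ref{lem-s21-attn-non}(2), the result simplifies to $(\attn_{\ans,1\to\ans,1}-\attn_{\ans,1\to\pred,2})\cdot 2B/n_y\pm \tilde O(B/(d n_y))$. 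Second, for the neuron $r_{g_2\cdot y}$ with $y\neq y_1$ and $g_1,g_2\in \fiber_{j,y}$, equation \eqref{attn-init-prop-sym-1} gives $V_{j,r}(g_1),V_{j,r}(g_2)\approx 2B$, while \eqref{attn-init-prop-sym-2} gives $V_{j,r}(y_0)=V_{j,r}(y_1)\approx -2B$ since $y_1\neq y$; the analogous simplification yields $2B\bigl((\attn_{\ans,1\to\pred,1}+\attn_{\ans,1\to\pred,2})-(\attn_{\ans,1\to\ans,0}+\attn_{\ans,1\to\ans,1})\bigr)=(\attn_{\ans,1\to\pred,2}-\attn_{\ans,1\to\ans,1})\cdot 2B\pm\tilde O(B/d)$. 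Every other $r'\in\hat{\fA}_j$ is knocked below $-\varrho$ by the same mismatch argument used in the prior lemmas: at least one of $V_{j,r'}(g_\ell)$ or $V_{j,r'}(y_{\ell-1})$ is of order $-2B$ against a constant-order attention mass.

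The main obstacle I anticipate is clean bookkeeping of the $B/n_y$-scale contributions: because $\cE_6$ combines both a fiber and a value coincidence, the activated neuron for the correct prediction $j_2$ receives four contributions (two $g$-slots and two $y$-slots) that nominally include $(\attn_{\ans,1\to\ans,0}+\attn_{\ans,1\to\ans,1})\cdot 2B/n_y$, which is not literally $O(\delta)$. Following the existing lemmas, this term is absorbed into the stated $O(\delta)$ because it is dominated by the $\attn_{\pred,\cdot}\cdot 2B$ terms used downstream when the lemma feeds into the gradient computations of Section 2.1; one should verify consistency with the way $B/n_y$ corrections are dropped in the parallel computations of Lemmas \ref{lem-s21-lambda-3-sym}--\ref{lem-s21-lambda-5-sym}. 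No new analytical tools beyond Lemmas \ref{lem-lambda-char-sym} and \ref{lem-prop-psi-sym} combined with the attention bounds of Lemma \ref{lem-s21-attn-non} are needed.
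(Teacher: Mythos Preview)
Your proposal is correct and follows essentially the same approach as the paper: expand $\Lambda_{5,j,r}$ via Lemma~\ref{lem-lambda-char-sym}, substitute the feature magnitudes from Lemma~\ref{lem-prop-psi-sym}, and simplify using the attention near-equality $\attn_{\ans,1\to\pred,1}\approx\attn_{\ans,1\to\ans,0}$ from Lemma~\ref{lem-s21-attn-non}. The paper in fact gives no standalone proof for Lemmas~\ref{lem-s21-lambda-1-sym}--\ref{lem-s21-lambda-6-sym}, treating them as routine computations in the same pattern you describe; your identification of the key simplification on $\cE_6$ (namely $r_{g_1\cdot y_1}=r_{g_2\cdot y_1}$ and $y_0=y_1$) and your caveat about the $(\attn_{\ans,0}+\attn_{\ans,1})\cdot 2B/n_y$ residual being absorbed by the paper's stated $O(\delta)$ convention are both accurate observations about how these lemmas are handled throughout the section.
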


\subsubsection{Gradient Lemma}
\begin{lemma}\label{lem-s21-gd1-non}
    If \Cref{induction-s21-non} holds for all iterations $<t$, given $s\in\tau(\X)$, we have
    \begin{align*}
       & \Big[-\nabla_{\Q^{(t)}_{4,3}}{\Loss^{2,2}_{5}}\Big]_{s,s}+ \Big[-\nabla_{\Q^{(t)}_{4,4}}{\Loss^{2,2}_{5}}\Big]_{s,s} 
        \\ &~~~~~\geq 
        \min\bigg\{ \Omega\Big(\frac{1}{d\cdot n_y}\Big), \Omega(\frac{B}{d})\cdot \E\bigg[(1-\logit^{(t)}_{5,j_2})\big| \tau(x_1)=s, \cE_1\bigg]\bigg\}>0.
    \end{align*}
\end{lemma}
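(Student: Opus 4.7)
The plan is to start from the gradient decomposition in \Cref{lem-grad-decompositions-sym}, so that
$$\Big[-\nabla_{\Q^{(t)}_{4,3}}{\Loss^{2,2}_{5}}\Big]_{s,s}+ \Big[-\nabla_{\Q^{(t)}_{4,4}}{\Loss^{2,2}_{5}}\Big]_{s,s}=\E\big[\cN_{s,2,\rom1}+\cN_{s,2,\rom2}+\cN_{s,2,\rom3}\big],$$
and then lower bound the sum event by event using the partition $\tau(\cY)$-space into the six events $\cE_1,\dots,\cE_6$ defined in \eqref{eq-event1-non}-\eqref{eq-event6-non}. The term $\cN_{s,2,\rom3}$ is immediately negligible since it is bounded by $\tilde O(\sigma_0^q)$ times a logit mass, which is at most $O(1/\poly d)$ and can be absorbed into lower-order error. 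The heart of the argument is the analysis of $\cN_{s,2,\rom1}$, whose sign on each activated neuron is driven by the combination $-\attn_{\ans,1\to\ans,0}V_{j_2,r}(y_0)-\attn_{\ans,1\to\pred,1}V_{j_2,r}(g_1)+(1-\attn_{\ans,1\to\ans,1}-\attn_{\ans,1\to\pred,2})\Lambda_{5,j_2,r}$, together with $\cN_{s,2,\rom2}$ which captures the confounding incorrect predictions.

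The main step is the event $\cE_1$, which occurs with probability $1-O(1/n_y)$ and represents the clean regime. Using \Cref{lem-s21-lambda-1-sym}, on $\cE_1$ the only activated neuron for $j_2$ is $r_{g_2\cdot y_1}$, and the feature properties \eqref{attn-init-prop-sym-1}-\eqref{attn-init-prop-sym-3} give $V_{j_2,r_{g_2\cdot y_1}}(y_0)\le -\Omega(B/n_y)$, $V_{j_2,r_{g_2\cdot y_1}}(g_1)\le -\Omega(B)$, while \Cref{lem-s21-attn-non} plus \Cref{induction-s21-non}(d) gives $1-\attn_{\ans,1\to\ans,1}-\attn_{\ans,1\to\pred,2}\ge \Omega(1)$ and $\Lambda_{5,j_2,r_{g_2\cdot y_1}}\ge -O(\delta)$ (and in fact positive or at worst mildly smoothed). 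Putting this together with $\Pr(\cH_\phi)=\Theta(1/d)$ conditioned on $\tau(x_1)=s$, the contribution to $\cN_{s,2,\rom1}$ from $\cE_1$ is at least $\Omega(B/d)\cdot\E[(1-\logit^{(t)}_{5,j_2})\cdot\ReLU'(\Lambda_{5,j_2,r_{g_2\cdot y_1}})\,|\,\tau(x_1)=s,\cE_1]$. Since $\Lambda_{5,j_2,r_{g_2\cdot y_1}}$ lies in either the smoothed or linear regime, the derivative factor contributes an $\Omega(1/n_y)$ factor in the worst case (smoothed regime), which produces the first $\Omega(1/(dn_y))$ branch when $(1-\logit^{(t)}_{5,j_2})=\Omega(1)$, and the $\Omega(B/d)\cdot\E[(1-\logit)]$ branch once $\Lambda$ reaches the linear regime.

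The confounding contributions from $\cN_{s,2,\rom2}$ and from the low-probability events $\cE_2,\cE_3,\cE_4$ must then be controlled as perturbations. For each such event I would use the corresponding $\Lambda$-characterization (Lemmas \ref{lem-s21-lambda-2-sym}, \ref{lem-s21-lambda-3-sym}, \ref{lem-s21-lambda-4-sym}) together with the symmetry-of-features relation $|C_\alpha V_{j,r}(y)-V_{j,r}(g)|\le O(\delta)$ from \Cref{thm:learning-symmetric-actions}(C), which forces the confounding predictions' $V$-values to largely cancel against $\attn\cdot\Lambda$. The rare events $\cE_5,\cE_6$ occur with probability $O(1/n_y^2)$ and can be bounded trivially by $O(B/d\cdot n_y^2)$ using \Cref{lem-s21-lambda-5-sym}-\ref{lem-s21-lambda-6-sym}, so they do not overturn the main sign. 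Summing over all events and taking the minimum of the two regimes of $(1-\logit)$ yields the stated lower bound.

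The main obstacle will be the case $\cE_2$ (probability $\Theta(1/n_y)$): here both $g_1,g_2\in\fiber_{j'_2,y_0}$, so the incorrect prediction $j'_2=\tau(g_2(y_0))$ in $\cN_{s,2,\rom2}$ inherits a large activation $\Lambda_{5,j'_2,r_{g_2\cdot y_0}}\approx (\attn_{\ans,1\to\pred,2}+\attn_{\ans,1\to\pred,1}-\attn_{\ans,1\to\ans,1})\cdot 2B$, which can be comparable in magnitude to the positive contribution from $j_2$. To handle this I will exploit the asymmetric feature shape $|C_\alpha V_{j'_2,r_{g_2\cdot y_0}}(y_0)-V_{j'_2,r_{g_2\cdot y_0}}(g_1)|\le O(\delta)$ and the bound $\attn_{\ans,1\to\pred,1}-\attn_{\ans,1\to\ans,0}=\tilde O(1/d)$ from \Cref{lem-s21-attn-non}, which together ensure that the combination $-\attn_{\ans,1\to\ans,0}V_{j'_2,r}(y_0)-\attn_{\ans,1\to\pred,1}V_{j'_2,r}(g_1)+(1-\attn_{\ans,1\to\ans,1}-\attn_{\ans,1\to\pred,2})\Lambda_{5,j'_2,r}$ only reduces the sign slightly rather than flipping it, leaving the $\cE_1$ contribution dominant after accounting for the $1/n_y$ probability discount of $\cE_2$.
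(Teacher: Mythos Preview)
Your overall framework is right—the paper also uses the decomposition $\E[\cN_{s,2,\rom1}+\cN_{s,2,\rom2}+\cN_{s,2,\rom3}]$ and partitions into the events $\cE_1,\dots,\cE_6$—but there is a genuine gap in how you obtain the $\Omega(1/(dn_y))$ branch of the minimum.

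You attribute this branch to $\cE_1$, writing that ``the derivative factor contributes an $\Omega(1/n_y)$ factor in the worst case (smoothed regime).'' This is not justified. When $\attn^{(t)}_{\ans,1\to\pred,2}-\attn^{(t)}_{\ans,1\to\ans,0}$ is tiny (say $\le \varrho/B$), the activation $\Lambda^{(t)}_{5,j_2,r_{g_2\cdot y_1}}$ on $\cE_1$ can be $O(\varrho)$ or even smaller, so $\ReLU'(\Lambda)$ can be as small as $O(\varrho^{q-1})$—far below $\Omega(1/n_y)$. There is no reason the smoothed-regime derivative should be $\Omega(1/n_y)$; you have not tied it to $n_y$ at all. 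The paper handles this by an explicit \emph{regime split} on whether $\attn^{(t)}_{\ans,1\to\pred,2}-\attn^{(t)}_{\ans,1\to\ans,0}$ exceeds $\varrho/B$. In the small-gap regime the $\cE_1$ contribution is only shown to be $\ge 0$; the $\Omega(1/(dn_y))$ positive lower bound comes instead from the event $\cE_3=\{y_0=y_1,\,g_1(y_1)\ne g_2(y_1)\}$, where $\Lambda^{(t)}_{5,j_2,r_{g_2\cdot y_1}}=\attn^{(t)}_{\ans,1\to\pred,2}\cdot 2B=\Omega(B)$ is \emph{forced} into the linear regime and $(1-\logit^{(t)}_{5,j_2})=\Omega(1)$. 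Since $\Pr(\cE_3)=\Theta(1/n_y)$, this yields $\Omega(B/(n_y d))$. So your plan to treat $\cE_3$ merely as a perturbation to be controlled is backwards in this regime: it is the source of positivity, not a nuisance term. Similarly, on $\cE_2$ the $j'_2$ term in $\cN_{s,2,\rom2}$ is shown to be nonnegative (using $V_{j'_2,r_{g_1\cdot y_0}}(g_1)\approx 2B$ since $g_1\in\fiber_{j'_2,y_0}$), rather than merely small enough to be absorbed.

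A minor point: you have the magnitudes of $V_{j_2,r_{g_2\cdot y_1}}(y_0)$ and $V_{j_2,r_{g_2\cdot y_1}}(g_1)$ swapped. From the asymmetric feature shape $V(g)\approx C_\alpha V(y)$ with $C_\alpha=\Theta(n_y)$ and the cancellation relations, one gets $V_{j_2,r_{g_2\cdot y_1}}(y_0)\approx -2B$ while $V_{j_2,r_{g_2\cdot y_1}}(g_1)\approx -2B/n_y$ (since $g_1\notin\fiber_{j_2,y_1}$ on $\cE_1$). This does not break your sign argument there, but it matters elsewhere in the event-by-event bookkeeping.
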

\begin{proof}  
    By  \Cref{lem-grad-sum-sym} and \Cref{lem-grad-decompositions-sym}, we have  $\big[-\nabla_{\Qb_{4,3}}\Loss_5^{2,2}\big]_{s,s}+\big[-\nabla_{\Qb_{4,4}}\Loss_5^{2,2}\big]_{s,s}=\E\big[\cN_{s,2,\rom1}+\cN_{s,2,\rom2}+\cN_{s,2,\rom3}\big]$. Based on\Cref{lem-s21-lambda-1-sym}-\Cref{lem-s21-lambda-6-sym}, we can first directly bound the term $\cN^{(t)}_{s,2,\rom3}$ as follows:
\begin{align*}
    \E\bigg[ \cN^{(t)}_{s,2,\rom3}\bigg] \leq \tilde{O}(\sigma_0^q)=\frac{1}{\poly d}.
\end{align*}
In the following discussion, we focus on  $\cN^{(t)}_{s,2,\rom1}$ and $\cN^{(t)}_{s,2,\rom2}$, and consider two regimes for the gradient lower bound: (i) $\E\big[\attn^{(t)}_{{\ans,1} \rightarrow \pred,2}-\attn^{(t)}_{{\ans,1} \rightarrow \ans,0}\mid \tau(x_1)=s\big]\leq \frac{\varrho}{B}$, and  (ii) $\E\big[\attn^{(t)}_{{\ans,1} \rightarrow \pred,2}-\attn^{(t)}_{{\ans,1} \rightarrow \ans,0}\mid \tau(x_1)=s\big]\geq \frac{\varrho}{B}$. The proof strategy is to analyze $\cN^{(t)}_{s,2,\rom1}$ and $\cN^{(t)}_{s,2,\rom2}$ under different event conditions in two regimes.

\begin{enumerate}
    \item For regime (i), 
  \begin{enumerate}
        \item For $\Zb^{2,1}\in \cE_{1}$, by \Cref{induction-s21-non} and \Cref{lem-s21-lambda-1-sym}, we have an immediate logit upper bound for $j\in\tau(\cY)$: $\logit^{(t)}_{5,j}\leq O\big(\frac{1}{d}\big)$. Then  by \Cref{lem-grad-sum-sym}, we have
        \begin{align*}
           &\E\bigg[ \cN^{(t)}_{s,2,\rom1} \1_{\cE_1}\bigg]\\
           &=
           \E\bigg[  (1-\logit^{(t)}_{5,j_2})\cdot \Big(\sum_{r\in\hat{\fA}_{j_2}}\ReLU^{\prime}(\Lambda^{(t)}_{5,j_2, r
             })\cdot \\
             &~~~~~\Big( -\attn^{(t)}_{{\ans,1} \rightarrow \ans,0} \cdot V_{j_2,  r}(y_0)-\attn^{(t)}_{{\ans,1} \rightarrow \pred,1} \cdot V_{j_2,  r}(g_1)\\
             &~~~~~+\big(1-\attn^{(t)}_{{\ans,1} \rightarrow \ans,1}-\attn^{(t)}_{{\ans,1} \rightarrow \pred,2}\big) \Lambda^{(t)}_{5,j_2, r}\pm\tilde{O}(\sigma_0) \Big)\pm \tilde{O}(\delta^{q}) \Big)\1_{\tau(x_1)=s}\1_{\cE_1}\bigg].
        \end{align*}
       By \Cref{lem-s21-lambda-1-sym}, we can obtain 
        \begin{align}
            &\E\bigg[  (1-\logit^{(t)}_{5,j_2})\cdot \Big(\ReLU^{\prime}(\Lambda^{(t)}_{5,j_2, r_{g_2\cdot y_1}
               })\cdot \notag \\
               &~~~\Big( -\attn^{(t)}_{{\ans,1} \rightarrow \ans,0} \cdot V_{j_2,  r_{g_2\cdot y_1}}(y_0)-\attn^{(t)}_{{\ans,1} \rightarrow \pred,1} \cdot V_{j_2,  r_{g_2\cdot y_1}}(g_1)\notag\\
               &~~~+\big(1-\attn^{(t)}_{{\ans,1} \rightarrow \ans,1}-\attn^{(t)}_{{\ans,1} \rightarrow \pred,2}\big) \Lambda^{(t)}_{5,j_2, r_{g_2\cdot y_1}}\pm\tilde{O}(\sigma_0) \Big)\pm \tilde{O}(\delta^{q}) \Big)\1_{\tau(x_1)=s}\1_{\cE_1}\bigg] \notag\\ 
             &  \geq   \Omega\Big(\frac{B}{d}\Big)\cdot  \E\bigg[\ReLU^{\prime}(\Lambda^{(t)}_{5,j_2, r_{g_2\cdot y_1}
             })\big| \tau(x_1)=s, \cE_1\bigg]. \label{eq-s21-gd1-non-1}
        \end{align}
        On the other hand, 
        \begin{align*}
          & - \E\bigg[ (1-\logit^{(t)}_{5,j_2})\cdot \ReLU^{\prime}(\Lambda^{(t)}_{5,j_2, r_{g_1\cdot y_0}
              })\cdot \\
              &~~~~~~~~~~~~\Big( \attn^{(t)}_{{\ans,1} \rightarrow \ans,0} \cdot V_{j_2,  r_{g_1\cdot y_0}}(y_0)+\attn^{(t)}_{{\ans,1} \rightarrow \pred,1} \cdot V_{j_2,  r_{g_1\cdot y_0}}(g_1)\Big)\\
              &~~~~~~~~~~~~~~~\1_{\tau(x_1)=s}\1_{\cE_1}\1_{g_1(y_0)=g_2(y_1)}\bigg]\\
              &\geq -O\big(\frac{B}{d\cdot n_y}\big)\cdot \E\bigg[\ReLU^{\prime}(\Lambda^{(t)}_{5,j_2, r_{g_2\cdot y_1}
              })\big| \tau(x_1)=s, \cE_1\bigg],
        \end{align*}
        which implies that $\E\bigg[ \cN^{(t)}_{s,2,\rom1} \1_{\cE_1}\bigg]\geq  \Omega\Big(\frac{B}{d}\Big)\cdot  \E\bigg[\ReLU^{\prime}(\Lambda^{(t)}_{5,j_2, r_{g_2\cdot y_1}
             })\big| \tau(x_1)=s, \cE_1\bigg]\geq 0$.
             Moving to $\cN^{(t)}_{s,2,\rom2}$, by \Cref{lem-grad-sum-sym} and \Cref{lem-s21-lambda-1-sym}, we have
        \begin{align*}
           & \E\bigg[ \cN^{(t)}_{s,2,\rom2} \1_{\cE_1}\bigg]\\
           &= \E\bigg[  \sum_{j\neq j_2\in\tau(\cY)}\logit^{(t)}_{5,j} \cdot \Big(\sum_{r\in\hat{\fA}_{j}}\ReLU^{\prime}(\Lambda_{5,j,r
             })\cdot  \\
            &~~~~~~~~~~\Big( \attn^{(t)}_{{\ans,1} \rightarrow \ans,0} \cdot V_{j,  r}(y_0)+\attn^{(t)}_{{\ans,1} \rightarrow \pred,1} \cdot V_{j,  r}(g_1)\\
             &~~~~~~~-\big(1-\attn^{(t)}_{{\ans,1} \rightarrow \ans,1}-\attn^{(t)}_{{\ans,1} \rightarrow \pred,2}\big) \Lambda^{(t)}_{5,j, r}\pm\tilde{O}(\sigma_0) \Big)\pm \tilde{O}(\delta^{q}) \Big)\1_{\tau(x_1)=s}\1_{\cE_1}\bigg]\\
             &\geq -{O}\Big(\frac{n_y\cdot B}{d^2}\Big).
         \end{align*}
         \item For $\Zb^{2,1}\in \cE_{2}$, by \Cref{induction-s21-non} and \Cref{lem-s21-lambda-2-sym}, we can also obtain a crude bound of logit: $\logit^{(t)}_{5,j}\leq O\big(\frac{1}{d}\big)$ for $j\neq j'_2$. Then 
          $\cN^{(t)}_{s,2,\rom1}$ can be bounded in the same way as \eqref{eq-s21-gd1-non-1}, and we have 
        \begin{align*}
            \E\bigg[ \cN^{(t)}_{s,2,\rom1} \1_{\cE_2}\bigg]\geq  \Omega\Big(\frac{B}{d\cdot n_y}\Big)\cdot  \E\bigg[\ReLU^{\prime}(\Lambda^{(t)}_{5,j_2, r_{g_2\cdot y_1}
             })\big| \tau(x_1)=s, \cE_2\bigg]\geq 0.
        \end{align*}
        Moving to $\cN^{(t)}_{s,2,\rom2}$,
        \begin{itemize}
            \item for $j=\tau(g_1(y_1))$, we have 
            \begin{align*}
                &= \E\bigg[  \logit_{5,j}^{(t)}\Big(\ReLU^{\prime}(\Lambda_{5,j,r_{g_1\cdot y_1}
                  })\cdot  \\
                 &~~~\Big( \attn^{(t)}_{{\ans,1} \rightarrow \ans,0} \cdot V_{j,  r_{g_1\cdot y_1}}(y_0)+\attn^{(t)}_{{\ans,1} \rightarrow \pred,1} \cdot V_{j,  r_{g_1\cdot y_1}}(g_1)\\
                  &~~-\big(1-\attn^{(t)}_{{\ans,1} \rightarrow \ans,1}-\attn^{(t)}_{{\ans,1} \rightarrow \pred,2}\big) \Lambda^{(t)}_{5,j, r}\pm\tilde{O}(\sigma_0) \Big)\pm \tilde{O}(\delta^{q}) \Big)\1_{\tau(x_1)=s}\1_{\cE_2}\bigg]\\
                  &\geq -{O}\Big(\frac{B}{n^2_y\cdot d^2}\Big).
              \end{align*}
              where the last inequality is due to the cancellation of the term $\attn^{(t)}_{{\ans,1} \rightarrow \ans,0} \cdot V_{j,  r_{g_1\cdot y_1}}(y_0)+\attn^{(t)}_{{\ans,1} \rightarrow \pred,1} \cdot V_{j,  r_{g_1\cdot y_1}}(g_1)$ and the fact that
            \begin{align*}
                \Lambda^{(t)}_{5,\tau(g_1(y_1)),r_{g_1\cdot y_1}} =  \Big(\attn^{(t)}_{\ans,1\to \ans,1}-\attn^{(t)}_{\ans,1\to \pred,2}\Big) \cdot \frac{2B}{n_y}+O(\delta)\leq O\big(\frac{B}{n_y}\big).
            \end{align*}
            \item for $j=\tau(g_2(y))$ if there exists $y\not=y_0, y_1$ s.t., $g_1(y)=g_2(y)$
         \begin{itemize}
            \item    when $\E\big[\attn^{(t)}_{{\ans,1} \rightarrow \ans,1}-\attn^{(t)}_{{\ans,1} \rightarrow \pred,2}\mid \tau(x_1)=s\big]\leq 0.01$, by \Cref{induction-s21-non} and \Cref{lem-s21-lambda-2-sym}, we have $\logit^{(t)}_{5,\tau(g_2(y))}\ll O\big(\frac{1}{d}\big)\cdot \logit^{(t)}_{5,j_2'}$ and $\logit^{(t)}_{5,j_2'}=\Omega(1)$.  Then 
            \begin{align*}
                &\bigg| \E\bigg[  \logit^{(t)}_{5,j} \cdot \Big(\ReLU^{\prime}(\Lambda^{(t)}_{5,j,r_{g_1\cdot y}
                  })\cdot  \\
                 &~~~\Big( \attn^{(t)}_{{\ans,1} \rightarrow \ans,0} \cdot V_{j,  r_{g_1\cdot y}}(y_0)+\attn^{(t)}_{{\ans,1} \rightarrow \pred,1} \cdot V_{j,  r_{g_1\cdot y}}(g_1)\\
                  &-\big(1-\attn^{(t)}_{{\ans,1} \rightarrow \ans,1}-\attn^{(t)}_{{\ans,1} \rightarrow \pred,2}\big) \Lambda^{(t)}_{5,j, r_{g_1\cdot y}}\pm\tilde{O}(\sigma_0) \Big)\pm \tilde{O}(\delta^{q}) \Big)\1_{\tau(x_1)=s}\1_{\cE_2}\bigg]\bigg|\\
                  &\leq  {O}\Big(\frac{B}{n_y\cdot d^2}\Big).
              \end{align*}
              On the other hand,
              \begin{align*}
                & \E\bigg[  \logit^{(t)}_{5,j'_2} \cdot \Big(\ReLU^{\prime}(\Lambda^{(t)}_{5,j'_2,r_{g_1\cdot y_0}
                  })\cdot  \\
                 &~~~\Big( \attn^{(t)}_{{\ans,1} \rightarrow \ans,0} \cdot V_{j'_2,  r_{g_1\cdot y_0}}(y_0)+\attn^{(t)}_{{\ans,1} \rightarrow \pred,1} \cdot V_{j'_2,  r_{g_1\cdot y_0}}(g_1)\\
                  &-\big(1-\attn^{(t)}_{{\ans,1} \rightarrow \ans,1}-\attn^{(t)}_{{\ans,1} \rightarrow \pred,2}\big) \Lambda^{(t)}_{5,j'_2, r_{g_1\cdot y_0}}\pm\tilde{O}(\sigma_0) \Big)\\
                  &~~~~~~~~~~\pm \tilde{O}(\delta^{q}) \Big)\1_{\tau(x_1)=s}\1_{\cE_1}\bigg]\\
                  &\geq  {\Omega}\Big(\frac{B}{n_y\cdot d}\Big),
              \end{align*}
              where the last inequality follows from the fact that 
              \begin{align*}
                &\attn^{(t)}_{{\ans,1} \rightarrow \ans,0} \cdot V_{j'_2,  r_{g_1\cdot y_0}}(y_0)+\attn^{(t)}_{{\ans,1} \rightarrow \pred,1} \cdot V_{j'_2,  r_{g_1\cdot y_0}}(g_1)\\
                  &-\big(1-\attn^{(t)}_{{\ans,1} \rightarrow \ans,1}-\attn^{(t)}_{{\ans,1} \rightarrow \pred,2}\big) \Lambda^{(t)}_{5,j'_2, r_{g_1\cdot y_0}} \\
                  &= \attn^{(t)}_{{\ans,1} \rightarrow \pred,1} \cdot \Big(2B-2\Lambda^{(t)}_{5,j'_2, r_{g_1\cdot y_0}} \Big) \pm O(\delta)\geq \Omega(B)
.              \end{align*}
              \item when $\E\big[\attn^{(t)}_{{\ans,1} \rightarrow \ans,1}-\attn^{(t)}_{{\ans,1} \rightarrow \pred,2}\mid \tau(x_1)=s\big]\geq 0.01$, by  \Cref{lem-s21-lambda-2-sym}, we have $\Lambda^{(t)}_{5,j, r_{g_1\cdot y}}$ cannot be activated. Furthermore,
          \begin{align*}
                & \E\bigg[  \logit^{(t)}_{5,j'_2} \cdot \Big(\ReLU^{\prime}(\Lambda^{(t)}_{5,j'_2,r_{g_1\cdot y_0}
                  })\cdot  \\
                 &~~~\Big( \attn^{(t)}_{{\ans,1} \rightarrow \ans,0} \cdot V_{j'_2,  r_{g_1\cdot y_0}}(y_0)+\attn^{(t)}_{{\ans,1} \rightarrow \pred,1} \cdot V_{j'_2,  r_{g_1\cdot y_0}}(g_1)\\
                  &-\big(1-\attn^{(t)}_{{\ans,1} \rightarrow \ans,1}-\attn^{(t)}_{{\ans,1} \rightarrow \pred,2}\big) \Lambda^{(t)}_{5,j'_2, r_{g_1\cdot y_0}}\pm\tilde{O}(\sigma_0) \Big)\pm \tilde{O}(\delta^{q}) \Big)\1_{\tau(x_1)=s}\1_{\cE_1}\bigg]\\
                  &\geq  0.
              \end{align*}
         \end{itemize}
      
        \end{itemize}
        
       Putting the above discussion together, we have
       \begin{align*}
        \E\bigg[ \cN^{(t)}_{s,2,\rom2} \1_{\cE_2}\bigg]\geq  -O\Big(\frac{B}{n^2_y\cdot d^2}\Big).
       \end{align*}
       \item For $\Zb^{2,1}\in \cE_3$, by \Cref{induction-s21-non} and \Cref{lem-s21-lambda-3-sym}, we can first have some facts of logit:
       \begin{align*}
        1-\logit^{(t)}_{5,j_2} =\Omega(1),\quad  &\logit^{(t)}_{5,\tau(g_1(y_1))}=\Omega(1)\\
        \logit_{5,j}^{(t)}\leq \frac{1}{\poly d} & \text{ for other } j. 
       \end{align*}
       Therefore, we have 
       \begin{align*}
        &\E\bigg[ \cN^{(t)}_{s,2,\rom1} \1_{\cE_3}\bigg]\\
        &=
        \E\bigg[  (1-\logit^{(t)}_{5,j_2})\cdot \Big( -\attn^{(t)}_{{\ans,1} \rightarrow \ans,0} \cdot V_{j_2,  r_{g_2\cdot y_1}}(y_1)-\attn^{(t)}_{{\ans,1} \rightarrow \pred,1} \cdot V_{j_2,  r_{g_2\cdot y_1}}(g_1)\\
          &~~~~~+\big(1-\attn^{(t)}_{{\ans,1} \rightarrow \ans,1}-\attn^{(t)}_{{\ans,1} \rightarrow \pred,2}\big) \Lambda^{(t)}_{5,j_2, r_{g_2\cdot y_1}}\pm\tilde{O}(\sigma_0) \pm \tilde{O}(\delta^{q}) \Big)\1_{\tau(x_1)=s}\1_{\cE_3}\bigg]\\
          &\geq \Omega\Big(\frac{B}{n_y\cdot d}\Big).
     \end{align*}
     Moving to $\cN^{(t)}_{s,2,\rom2}$,  we have
     \begin{align*}
        & \E\bigg[ \cN^{(t)}_{s,2,\rom2} \1_{\cE_3}\bigg]\\
        &\geq \E\bigg[ \logit^{(t)}_{5,\tau(g_1(y_1))} \cdot\\
        &~~~~ \Big( \attn^{(t)}_{{\ans,1} \rightarrow \ans,0} \cdot V_{\tau(g_1(y_1)),  r_{g_1\cdot y_1}}(y_1)+\attn^{(t)}_{{\ans,1} \rightarrow \pred,1} \cdot V_{\tau(g_1(y_1)),  r}(g_1)\\
        &-\big(1-\attn^{(t)}_{{\ans,1} \rightarrow \ans,1}-\attn^{(t)}_{{\ans,1} \rightarrow \pred,2}\big) \Lambda^{(t)}_{5,\tau(g_1(y_1)), r_{g_1\cdot y_1}}\pm\tilde{O}(\sigma_0) \\
        &\pm \tilde{O}(\delta^{q}) \Big)\1_{\tau(x_1)=s}\1_{\cE_3}\bigg]-{O}\Big(\frac{n_y\cdot B}{d\cdot \poly d}\Big)\\
        &\geq  \Omega\Big(\frac{B}{n_y\cdot d}\Big) -{O}\Big(\frac{B}{d\cdot \poly d}\Big)=\Omega\Big(\frac{B}{n_y\cdot d}\Big).
      \end{align*}
    \item For $\Zb^{2,1}\in \cE_{4}$, by comparing \Cref{lem-s21-lambda-4-sym} with \Cref{lem-s21-lambda-2-sym}, we can directly bound $\cN^{(t)}_{s,2,\rom2}$ in the same way as $\cE_{2}$, where the only difference is that we do not need to consider $\tau(g_1(y_1))$ for $\cE_4$. Thus $\E\bigg[ \cN^{(t)}_{s,2,\rom2} \1_{\cE_4}\bigg]\geq 0$. Moreover, it is clear that
    \begin{align*}
       & \E\bigg[ \cN^{(t)}_{s,2,\rom1} \1_{\cE_4}\bigg]\\
        &= \E\bigg[  (1-\logit^{(t)}_{5,j_2})\cdot \Big(\ReLU^{\prime}(\Lambda^{(t)}_{5,j_2, r_{g_2\cdot y_1}
           })\cdot \notag \\
           &~~~\Big( -\attn^{(t)}_{{\ans,1} \rightarrow \ans,0} \cdot V_{j_2,  r_{g_2\cdot y_1}}(y_0)-\attn^{(t)}_{{\ans,1} \rightarrow \pred,1} \cdot V_{j_2,  r_{g_2\cdot y_1}}(g_1)\notag\\
           &~~~+\big(1-\attn^{(t)}_{{\ans,1} \rightarrow \ans,1}-\attn^{(t)}_{{\ans,1} \rightarrow \pred,2}\big) \Lambda^{(t)}_{5,j_2, r_{g_2\cdot y_1}}\pm\tilde{O}(\sigma_0) \Big)\pm \tilde{O}(\delta^{q}) \Big)\1_{\tau(x_1)=s}\1_{\cE_4}\bigg] \\
          & \geq 0.
    \end{align*}
    where the last inequality is due to the cancellation of $\attn^{(t)}_{{\ans,1} \rightarrow \ans,0} \cdot V_{j_2,  r_{g_2\cdot y_1}}(y_0)+\attn^{(t)}_{{\ans,1} \rightarrow \pred,1} \cdot V_{j_2,  r_{g_2\cdot y_1}}(g_1)$, and the fact that
    \begin{align*}
        \Lambda^{(t)}_{5,j_2,r_{g_2\cdot y_1}} = \attn^{(t)}_{\ans,1\to \pred,2} \cdot 2B + O(\delta)\geq \Omega(B). 
    \end{align*} 
      \item For $\Zb^{2,1}\in \cE_5\cup\cE_6$,  by \Cref{induction-s21-non}, \Cref{lem-s21-lambda-5-sym} and \Cref{lem-s21-lambda-6-sym}, we can derive the following logit condition: $1-\logit^{(t)}_{5, j_2}, \logit^{(t)}_{5,j}\leq \frac{1}{\poly d}$ for $j\neq j_2$.  Hence, we can simply bound $\cN^{(t)}_{s,2,\rom1}$ and $\cN^{(t)}_{s,2,\rom2}$ as follows:
      \begin{align*}
        \bigg| \E\bigg[ \cN^{(t)}_{s,2,\rom1} \1_{\cE_m}\bigg]\bigg|,  \bigg| \E\bigg[ \cN^{(t)}_{s,2,\rom2} \1_{\cE_m}\bigg]\bigg|\leq O\Big(\frac{B}{d n_y}\cdot\frac{1}{\poly d}\Big) \text{ for }m\in\{5,6\}.
      \end{align*}
    \end{enumerate}   
    Putting it all together, we have for the regmie (1), 
    \begin{align*}
        \Big[-\nabla_{\Q^{(t)}_{4,3}}{\Loss^{2,2}_{5}}\Big]_{s,s}+ \Big[-\nabla_{\Q^{(t)}_{4,4}}{\Loss^{2,2}_{5}}\Big]_{s,s} =\sum_{\kappa\in\{\rom1,\rom2,\rom3\}}\sum_{i\in[6]}\E\bigg[ \cN^{(t)}_{s,2,\kappa} \1_{\cE_i}\bigg]\geq \Omega(\frac{B}{d\cdot n_y}).
    \end{align*}
    \item In regime (ii), the analysis follows a structure analogous to that of regime (i). The primary distinction lies in the fact that, under the main event $\cE_1$, the term $\Lambda^{(t)}_{5,j_2, r{g_2\cdot y_1}}$ is guaranteed to remain within the linear regime, thereby serving as the dominant component driving the overall gradient.
    \begin{enumerate}
        \item For $\Zb^{2,1}\in \cE_{1}$, by \Cref{induction-s21-non} and \Cref{lem-s21-lambda-1-sym}, we have 
        \begin{align*}
           &\E\bigg[ \cN^{(t)}_{s,2,\rom1} \1_{\cE_1}\bigg]\\
           &=
           \E\bigg[  (1-\logit^{(t)}_{5,j_2})\cdot \Big(\sum_{r\in\hat{\fA}_{j_2}}\ReLU^{\prime}(\Lambda^{(t)}_{5,j_2, r
             })\cdot \\
             &~~~~~\Big( -\attn^{(t)}_{{\ans,1} \rightarrow \ans,0} \cdot V_{j_2,  r}(y_0)-\attn^{(t)}_{{\ans,1} \rightarrow \pred,1} \cdot V_{j_2,  r}(g_1)\\
             &~~~~~+\big(1-\attn^{(t)}_{{\ans,1} \rightarrow \ans,1}-\attn^{(t)}_{{\ans,1} \rightarrow \pred,2}\big) \Lambda^{(t)}_{5,j_2, r}\pm\tilde{O}(\sigma_0) \Big)\pm \tilde{O}(\delta^{q}) \Big)\1_{\tau(x_1)=s}\1_{\cE_1}\bigg].
        \end{align*}
       By \Cref{lem-s21-lambda-1-sym}, we can obtain 
        \begin{align}
            &\E\bigg[  (1-\logit^{(t)}_{5,j_2})\cdot \Big(  
\Big( -\attn^{(t)}_{{\ans,1} \rightarrow \ans,0} \cdot V_{j_2,  r_{g_2\cdot y_1}}(y_0)-\attn^{(t)}_{{\ans,1} \rightarrow \pred,1} \cdot V_{j_2,  r_{g_2\cdot y_1}}(g_1)\notag\\
               &~~~+\big(1-\attn^{(t)}_{{\ans,1} \rightarrow \ans,1}-\attn^{(t)}_{{\ans,1} \rightarrow \pred,2}\big) \Lambda^{(t)}_{5,j_2, r_{g_2\cdot y_1}}\pm\tilde{O}(\sigma_0) \Big)\pm \tilde{O}(\delta^{q}) \Big)\1_{\tau(x_1)=s}\1_{\cE_1}\bigg]\label{eq-s21-gd1-non-2} \\ 
             &  \geq   \Omega\Big(\frac{B}{d}\Big)\cdot  \E\bigg[ (1-\logit^{(t)}_{5,j_2})\big| \tau(x_1)=s, \cE_1\bigg]. \notag
        \end{align}
        On the other hand, 
        \begin{align*}
          & - \E\bigg[ (1-\logit^{(t)}_{5,j_2})\cdot \ReLU^{\prime}(\Lambda^{(t)}_{5,j_2, r_{g_1\cdot y_0}
              })\cdot \\
              &~~~~~~~~~~~~\Big( \attn^{(t)}_{{\ans,1} \rightarrow \ans,0} \cdot V_{j_2,  r_{g_1\cdot y_0}}(y_0)+\attn^{(t)}_{{\ans,1} \rightarrow \pred,1} \cdot V_{j_2,  r_{g_1\cdot y_0}}(g_1)\Big)\\
              &~~~~~~~~~~~~~~~\1_{\tau(x_1)=s}\1_{\cE_1}\1_{g_1(y_0)=g_2(y_1)}\bigg]\\
              &\geq -O\big(\frac{B}{d\cdot n_y}\big)\cdot \E\bigg[ (1-\logit^{(t)}_{5,j_2})\big| \tau(x_1)=s, \cE_1\bigg],
        \end{align*}
        which implies that $\E\bigg[ \cN^{(t)}_{s,2,\rom1} \1_{\cE_1}\bigg]\geq  \Omega\Big(\frac{B}{d}\Big)\cdot  \E\bigg[ (1-\logit^{(t)}_{5,j_2})\big| \tau(x_1)=s, \cE_1\bigg]\geq 0$.
             Moving to $\cN^{(t)}_{s,2,\rom2}$, by \Cref{lem-s21-lambda-1-sym}, we have
             \begin{itemize}
                \item for $j=\tau(g_1(y_0))$, $r=r_{g_1\cdot y_0}$
                \begin{align*}
&\Big( \attn^{(t)}_{{\ans,1} \rightarrow \ans,0} \cdot V_{j,  r}(y_0)+\attn^{(t)}_{{\ans,1} \rightarrow \pred,1} \cdot V_{j,  r}(g_1)\\
                      &~~~~~~~-\big(1-\attn^{(t)}_{{\ans,1} \rightarrow \ans,1}-\attn^{(t)}_{{\ans,1} \rightarrow \pred,2}\big) \Lambda^{(t)}_{5,j, r}\pm\tilde{O}(\sigma_0) \Big) \geq 0.
                  \end{align*}
                  \item for $j=\tau(g_1(y_1))$, $r=r_{g_1\cdot y_1}$, due to the cancellation of $\attn^{(t)}_{{\ans,1} \rightarrow \ans,0} \cdot V_{j,  r}(y_0)+\attn^{(t)}_{{\ans,1} \rightarrow \pred,1} \cdot V_{j,  r}(g_1)$, we have
    \begin{align*}
                    &\Big( \attn^{(t)}_{{\ans,1} \rightarrow \ans,0} \cdot V_{j,  r}(y_0)+\attn^{(t)}_{{\ans,1} \rightarrow \pred,1} \cdot V_{j,  r}(g_1)\\
                                          &~~~~~~~-\big(1-\attn^{(t)}_{{\ans,1} \rightarrow \ans,1}-\attn^{(t)}_{{\ans,1} \rightarrow \pred,2}\big) \Lambda^{(t)}_{5,j, r}\pm\tilde{O}(\sigma_0) \Big)\\
                                          & \geq -\big(1-\attn^{(t)}_{{\ans,1} \rightarrow \ans,1}-\attn^{(t)}_{{\ans,1} \rightarrow \pred,2}\big) \Lambda^{(t)}_{5,j, r}.
                                      \end{align*}
                                      \item for $j=\tau(g_2(y_0))$, $r=r_{g_2\cdot y_0}$
                                      \begin{align*}
                      &\Big( \attn^{(t)}_{{\ans,1} \rightarrow \ans,0} \cdot V_{j,  r}(y_0)+\attn^{(t)}_{{\ans,1} \rightarrow \pred,1} \cdot V_{j,  r}(g_1)\\
                                            &~~~~~~~-\big(1-\attn^{(t)}_{{\ans,1} \rightarrow \ans,1}-\attn^{(t)}_{{\ans,1} \rightarrow \pred,2}\big) \Lambda^{(t)}_{5,j, r}\pm\tilde{O}(\sigma_0) \Big) \\
                                            & 
\geq                            \big(1-\attn^{(t)}_{{\ans,1} \rightarrow \ans,1}-\attn^{(t)}_{{\ans,1} \rightarrow \pred,2}\big) \Lambda^{(t)}_{5,j, r}.
                                        \end{align*}
                                        \item for $j=g_1(y)$, where $\exists y\neq y_0, y_1$, s.t., $g_{2}(y)=g_1(y)$, we have 
                \begin{align*}
                                            &\Big( \attn^{(t)}_{{\ans,1} \rightarrow \ans,0} \cdot V_{j,  r}(y_0)+\attn^{(t)}_{{\ans,1} \rightarrow \pred,1} \cdot V_{j,  r}(g_1)\\
                                                                  &~~~~~~~-\big(1-\attn^{(t)}_{{\ans,1} \rightarrow \ans,1}-\attn^{(t)}_{{\ans,1} \rightarrow \pred,2}\big) \Lambda^{(t)}_{5,j, r}\pm\tilde{O}(\sigma_0) \Big)\\
                                                                  & \geq -\big(1-\attn^{(t)}_{{\ans,1} \rightarrow \ans,1}-\attn^{(t)}_{{\ans,1} \rightarrow \pred,2}\big) \Lambda^{(t)}_{5,j, r}.
                                                              \end{align*}
             \end{itemize}
             Putting them together, and upper bound $\sum_{j\neq j_2\in\tau(\cY)}\logit^{(t)}_{5,j} $ by $1-\logit^{(t)}_{5,j_2}$, we have 
        \begin{align*}
           & \E\bigg[ \cN^{(t)}_{s,2,\rom2} \1_{\cE_1}\bigg]\\
           &= \E\bigg[  \sum_{j\neq j_2\in\tau(\cY)}\logit^{(t)}_{5,j} \cdot \Big(\sum_{r\in\hat{\fA}_{j}}\ReLU^{\prime}(\Lambda_{5,j,r
             })\cdot  \\
            &~~~~~~~~~~\Big( \attn^{(t)}_{{\ans,1} \rightarrow \ans,0} \cdot V_{j,  r}(y_0)+\attn^{(t)}_{{\ans,1} \rightarrow \pred,1} \cdot V_{j,  r}(g_1)\\
             &~~~~~~~-\big(1-\attn^{(t)}_{{\ans,1} \rightarrow \ans,1}-\attn^{(t)}_{{\ans,1} \rightarrow \pred,2}\big) \Lambda^{(t)}_{5,j, r}\pm\tilde{O}(\sigma_0) \Big)\pm \tilde{O}(\delta^{q}) \Big)\1_{\tau(x_1)=s}\1_{\cE_1}\bigg]\\
             &\geq -\E\bigg[\Big(1-\logit^{(t)}_{5,j_2} \Big)\cdot \big(1-\attn^{(t)}_{{\ans,1} \rightarrow \ans,1}-\attn^{(t)}_{{\ans,1} \rightarrow \pred,2}\big)\\
             &~~~\cdot \max_{y\neq y_0,y_1} \Big\{ \Lambda^{(t)}_{5,\tau(g_1(y_1)), r_{g_1\cdot y_1}}, \Lambda^{(t)}_{5,\tau(g_2(y_0)), r_{g_2\cdot y_0}}\Big\}\1_{\tau(x_1)=s}\1_{\cE_1}\bigg],
         \end{align*}
         where the last inequality is due to the fact that $\Lambda^{(t)}_{5,\tau(g_2(y_0)), r_{g_2\cdot y_0}}=\Lambda^{(t)}_{5,\tau(g_2(y)), r_{g_2\cdot y}}\pm \tilde{O}(\frac{B}{d\cdot n_y})$ for $y\neq y_0, y_1$ s.t., $g_1(y)=g_2(y)$.
         Notice that 
         \begin{align*}
            \max_{y\neq y_0,y_1} \Big\{ \Lambda^{(t)}_{5,\tau(g_1(y_1)), r_{g_1\cdot y_1}}, \Lambda^{(t)}_{5,\tau(g_2(y_0)), r_{g_2\cdot y_0}}\Big\}\leq 
        \\
        \max\bigg\{\attn^{(t)}_{{\ans,1} \rightarrow \pred,2}-\attn^{(t)}_{{\ans,1} \rightarrow \ans,1}, \Theta(\frac{1}{n_y})\bigg\}2B,
         \end{align*}
while 
\begin{align*}
    \eqref{eq-s21-gd1-non-2}&\geq \E\bigg[\Big(1-\logit^{(t)}_{5,j_2} \Big)\cdot \attn^{(t)}_{{\ans,1} \rightarrow \ans,0}\cdot 2B\1_{\tau(x_1)=s}\1_{\cE_1}\bigg]\\
    &= \frac{1}{2} \E\bigg[\Big(1-\logit^{(t)}_{5,j_2} \Big)\cdot \big(1-\attn^{(t)}_{{\ans,1} \rightarrow \ans,1}-\attn^{(t)}_{{\ans,1} \rightarrow \pred,2}\big)2B\1_{\tau(x_1)=s}\1_{\cE_1}\bigg].
\end{align*}

Since by \Cref{induction-s21-non}, $\attn^{(t)}_{{\ans,1} \rightarrow \pred,2}-\attn^{(t)}_{{\ans,1} \rightarrow \ans,1}\leq c_1\ll \frac{1}{2}$, thus we have 
\begin{align*}
    \E\bigg[ \cN^{(t)}_{s,2,\rom1} \1_{\cE_1}\bigg]+\E\bigg[ \cN^{(t)}_{s,2,\rom2} \1_{\cE_1}\bigg]\geq  \Omega\Big(\frac{B}{d}\Big)\cdot  \E\bigg[ (1-\logit^{(t)}_{5,j_2})\big| \tau(x_1)=s, \cE_1\bigg]. 
\end{align*}
         \item For $\Zb^{2,1}\in \cE_{2}$, 
          $\cN^{(t)}_{s,2,\rom1}$ can be bounded in the same way as \eqref{eq-s21-gd1-non-2}, and we have 
        \begin{align*}
            \E\bigg[ \cN^{(t)}_{s,2,\rom1} \1_{\cE_2}\bigg]\geq  \Omega\Big(\frac{B}{d\cdot n_y}\Big)\cdot  \E\bigg[(1-\logit^{(t)}_{5,j_2})\big| \tau(x_1)=s, \cE_2\bigg]\geq 0.
        \end{align*}
        Moving to $\cN^{(t)}_{s,2,\rom2}$,
        \begin{itemize}
            \item for $j=\tau(g_1(y_1))$, we have 
            \begin{align}
                &= \E\bigg[  \logit_{5,j}^{(t)}\Big(\ReLU^{\prime}(\Lambda_{5,j,r_{g_1\cdot y_1}
                  })\cdot \notag \\
                 &~~~\Big( \attn^{(t)}_{{\ans,1} \rightarrow \ans,0} \cdot V_{j,  r_{g_1\cdot y_1}}(y_0)+\attn^{(t)}_{{\ans,1} \rightarrow \pred,1} \cdot V_{j,  r_{g_1\cdot y_1}}(g_1)\label{eq-s21-gd1-non-3}\\
                  &~~-\big(1-\attn^{(t)}_{{\ans,1} \rightarrow \ans,1}-\attn^{(t)}_{{\ans,1} \rightarrow \pred,2}\big) \Lambda^{(t)}_{5,j, r}\pm\tilde{O}(\sigma_0) \Big)\pm \tilde{O}(\delta^{q}) \Big)\1_{\tau(x_1)=s}\1_{\cE_2}\bigg] \notag\\
                  &\geq -{O}\Big(\frac{B}{n^2_y\cdot d}\Big)\cdot  \E\bigg[(1-\logit^{(t)}_{5,j_2})\big| \tau(x_1)=s, \cE_2\bigg] \notag.
              \end{align}
              where the last inequality is due to the cancellation of the term $\attn^{(t)}_{{\ans,1} \rightarrow \ans,0} \cdot V_{j,  r_{g_1\cdot y_1}}(y_0)+\attn^{(t)}_{{\ans,1} \rightarrow \pred,1} \cdot V_{j,  r_{g_1\cdot y_1}}(g_1)$ and the fact that
            \begin{align*}
                \Lambda^{(t)}_{5,\tau(g_1(y_1)),r_{g_1\cdot y_1}} =  \Big(\attn^{(t)}_{\ans,1\to \ans,1}-\attn^{(t)}_{\ans,1\to \pred,2}\Big) \cdot \frac{2B}{n_y}+O(\delta)\leq O\big(\frac{B}{n_y}\big).
            \end{align*}
            \item  for $j'_2=\tau(g_2(y_0))=\tau(g_1(y_0))$, clearly, we have  
            \begin{align*}
                  & \E\bigg[  \logit^{(t)}_{5,j'_2} \cdot \Big(\ReLU^{\prime}(\Lambda^{(t)}_{5,j'_2,r_{g_1\cdot y_0}
                    })\cdot  \\
                   &~~~\Big( \attn^{(t)}_{{\ans,1} \rightarrow \ans,0} \cdot V_{j'_2,  r_{g_1\cdot y_0}}(y_0)+\attn^{(t)}_{{\ans,1} \rightarrow \pred,1} \cdot V_{j'_2,  r_{g_1\cdot y_0}}(g_1)\\
                    &-\big(1-\attn^{(t)}_{{\ans,1} \rightarrow \ans,1}-\attn^{(t)}_{{\ans,1} \rightarrow \pred,2}\big) \Lambda^{(t)}_{5,j'_2, r_{g_1\cdot y_0}}\pm\tilde{O}(\sigma_0) \Big)\pm \tilde{O}(\delta^{q}) \Big)\\
                    &~~~~~\1_{\tau(x_1)=s}\1_{\cE_2}\bigg]
                    \geq  0,
                \end{align*}
                where the last inequality is due to the fact that 
        \begin{align*}
           & \attn^{(t)}_{{\ans,1}\rightarrow \pred,1}\cdot V_{j'_2,  r_{g_1\cdot y_0}}(g_1)-\big(1-\attn^{(t)}_{{\ans,1}\rightarrow \ans,1}
                  \\
                  &~~~~~~~~  -\attn^{(t)}_{{\ans,1}\rightarrow \pred,2}\big)\Lambda^{(t)}_{5,j'_2, r_{g_1\cdot y_0}}\\
&\geq \attn^{(t)}_{{\ans,1}\rightarrow \pred,1}\\
&~~~~~~~\cdot\bigg(1-2\Big(\attn^{(t)}_{{\ans,1}\rightarrow \pred,2}+\attn^{(t)}_{{\ans,1}\rightarrow \pred,1}-\attn^{(t)}_{{\ans,1}\rightarrow \ans,1}\Big)\bigg)2B\\
&\geq \attn^{(t)}_{{\ans,1}\rightarrow \pred,1}\bigg(1-2\Big(c_1+0.25\Big)\bigg)2B\geq 0.
        \end{align*}
            \item for $j=\tau(g_2(y))$ if there exists $y\not=y_0, y_1$ s.t., $g_1(y)=g_2(y)$
         \begin{itemize}
            \item    when $\E\big[\attn^{(t)}_{{\ans,1} \rightarrow \ans,1}-\attn^{(t)}_{{\ans,1} \rightarrow \pred,2}\mid \tau(x_1)=s\big]\leq 0.01$, by \Cref{induction-s21-non} and \Cref{lem-s21-lambda-2-sym}, we have $\logit^{(t)}_{5,\tau(g_2(y))}\ll O\big(\frac{1}{d}\big)\cdot \logit^{(t)}_{5,j_2'}$. Then 
            \begin{align*}
                &\bigg| \E\bigg[  \logit^{(t)}_{5,j} \cdot \Big(\ReLU^{\prime}(\Lambda^{(t)}_{5,j,r_{g_1\cdot y}
                  })\cdot  \\
                 &~~~\Big( \attn^{(t)}_{{\ans,1} \rightarrow \ans,0} \cdot V_{j,  r_{g_1\cdot y}}(y_0)+\attn^{(t)}_{{\ans,1} \rightarrow \pred,1} \cdot V_{j,  r_{g_1\cdot y}}(g_1)\\
                  &-\big(1-\attn^{(t)}_{{\ans,1} \rightarrow \ans,1}-\attn^{(t)}_{{\ans,1} \rightarrow \pred,2}\big) \Lambda^{(t)}_{5,j, r_{g_1\cdot y}}\pm\tilde{O}(\sigma_0) \Big)\pm \tilde{O}(\delta^{q}) \Big)\1_{\tau(x_1)=s}\1_{\cE_2}\bigg]\bigg|\\
                  &\leq  {O}\Big(\frac{1}{d}\Big)\cdot \eqref{eq-s21-gd1-non-3}.
              \end{align*}
              \item when $\E\big[\attn^{(t)}_{{\ans,1} \rightarrow \ans,1}-\attn^{(t)}_{{\ans,1} \rightarrow \pred,2}\mid \tau(x_1)=s\big]\geq 0.01$, by  \Cref{lem-s21-lambda-2-sym}, we have $\Lambda^{(t)}_{5,j, r_{g_1\cdot y}}$ cannot be activated. 
         \end{itemize}
      
        \end{itemize}
        
       Putting the above discussion together, we have
       \begin{align*}
        \E\bigg[ \cN^{(t)}_{s,2,\rom2} \1_{\cE_1}\bigg]+  \E\bigg[ \cN^{(t)}_{s,2,\rom2} \1_{\cE_2}\bigg]\geq   \Omega\Big(\frac{B}{d\cdot n_y}\Big)\cdot  \E\bigg[(1-\logit^{(t)}_{5,j_2})\big| \tau(x_1)=s, \cE_2\bigg].
       \end{align*}
       \item For $\Zb^{2,1}\in \cE_3$, by \Cref{induction-s21-non} and \Cref{lem-s21-lambda-3-sym}, we can first have the following logit bound:
       \begin{align*}
        \logit_{5,j}^{(t)}\leq \frac{1}{\poly d}\cdot \Big(1-\logit^{(t)}_{5,j_2}\Big) & \text{ for } j\neq j_2, \tau\big(g_1(y_1)\big). 
       \end{align*}
       Therefore, we have 
       \begin{align*}
        &\E\bigg[ \cN^{(t)}_{s,2,\rom1} \1_{\cE_3}\bigg]\\
        &=
        \E\bigg[  (1-\logit^{(t)}_{5,j_2})\cdot \Big( -\attn^{(t)}_{{\ans,1} \rightarrow \ans,0} \cdot V_{j_2,  r_{g_2\cdot y_1}}(y_1)-\attn^{(t)}_{{\ans,1} \rightarrow \pred,1} \cdot V_{j_2,  r_{g_2\cdot y_1}}(g_1)\\
          &~~~~~+\big(1-\attn^{(t)}_{{\ans,1} \rightarrow \ans,1}-\attn^{(t)}_{{\ans,1} \rightarrow \pred,2}\big) \Lambda^{(t)}_{5,j_2, r_{g_2\cdot y_1}}\pm\tilde{O}(\sigma_0) \pm \tilde{O}(\delta^{q}) \Big)\1_{\tau(x_1)=s}\1_{\cE_3}\bigg]\\
          &\geq \Omega\Big(\frac{B}{n_y\cdot d}\Big)\cdot  \E\bigg[(1-\logit^{(t)}_{5,j_2})\big| \tau(x_1)=s, \cE_3\bigg].
     \end{align*}
     Moving to $\cN^{(t)}_{s,2,\rom2}$,  we have
     \begin{align*}
        & \E\bigg[ \cN^{(t)}_{s,2,\rom2} \1_{\cE_3}\bigg]\\
        &\geq \E\bigg[ \logit^{(t)}_{5,\tau(g_1(y_1))} \cdot \\
        &~~\Big( \attn^{(t)}_{{\ans,1} \rightarrow \ans,0} \cdot V_{\tau(g_1(y_1)),  r_{g_1\cdot y_1}}(y_1)+\attn^{(t)}_{{\ans,1} \rightarrow \pred,1} \cdot V_{\tau(g_1(y_1)),  r}(g_1)\\
        &-\big(1-\attn^{(t)}_{{\ans,1} \rightarrow \ans,1}-\attn^{(t)}_{{\ans,1} \rightarrow \pred,2}\big) \Lambda^{(t)}_{5,\tau(g_1(y_1)), r_{g_1\cdot y_1}}\pm\tilde{O}(\sigma_0) \pm \tilde{O}(\delta^{q}) \Big)\\
        &~~~~~\1_{\tau(x_1)=s}\1_{\cE_3}\bigg]\\
        &-{O}\Big(\frac{n_y\cdot B}{n_y\cdot d\cdot \poly d}\Big)\cdot \E\bigg[(1-\logit^{(t)}_{5,j_2})\big| \tau(x_1)=s, \cE_3\bigg].
      \end{align*}
      Thus,
      \begin{align*}
        \E\bigg[ \cN^{(t)}_{s,2,\rom2} \1_{\cE_3}\bigg]+ \E\bigg[ \cN^{(t)}_{s,2,\rom2} \1_{\cE_3}\bigg]\geq \Omega\Big(\frac{B}{n_y\cdot d}\Big)\cdot  \E\bigg[(1-\logit^{(t)}_{5,j_2})\big| \tau(x_1)=s, \cE_3\bigg].
      \end{align*}
    \item For $\Zb^{2,1}\in \cE_{4}$, we can bound the gradient in a manner similar to regime (i), and obtain 
    \begin{align*}
        \E\bigg[ \cN^{(t)}_{s,2,\rom2} \1_{\cE_4}\bigg]+ \E\bigg[ \cN^{(t)}_{s,2,\rom2} \1_{\cE_4}\bigg]\geq 0.
      \end{align*}
      \item For $\Zb^{2,1}\in \cE_5\cup\cE_6$,  by \Cref{induction-s21-non}, \Cref{lem-s21-lambda-5-sym} and \Cref{lem-s21-lambda-6-sym}, we can derive the following logit condition: $1-\logit^{(t)}_{5, j_2}, \logit^{(t)}_{5,j}\leq \frac{1}{\poly d}\cdot  \E\bigg[(1-\logit^{(t)}_{5,j_2})\big| \tau(x_1)=s, \cE_1\bigg]$ for $j\neq j_2$.  Hence, we can simply bound $\cN^{(t)}_{s,2,\rom1}$ and $\cN^{(t)}_{s,2,\rom2}$ as follows:
      \begin{align*}
        &\bigg| \E\bigg[ \cN^{(t)}_{s,2,\rom1} \1_{\cE_m}\bigg]\bigg|,  \bigg| \E\bigg[ \cN^{(t)}_{s,2,\rom2} \1_{\cE_m}\bigg]\bigg|\\& \leq O\Big(\frac{B}{d n_y}\cdot\frac{1}{\poly d}\Big)\cdot  \E\bigg[(1-\logit^{(t)}_{5,j_2})\big| \tau(x_1)=s, \cE_1\bigg], \text{ for } m\in\{5,6\}.
      \end{align*}
    \end{enumerate}  
    Putting everything together, we have for the regmie (ii), 
    \begin{align*}
        \Big[-\nabla_{\Q^{(t)}_{4,3}}{\Loss^{2,2}_{5}}\Big]_{s,s}+ \Big[-\nabla_{\Q^{(t)}_{4,4}}{\Loss^{2,2}_{5}}\Big]_{s,s} =\sum_{\kappa\in\{\rom1,\rom2,\rom3\}}\sum_{i\in[6]}\E\bigg[ \cN^{(t)}_{s,2,\kappa} \1_{\cE_i}\bigg]\\
        \geq \Omega(\frac{B}{d})\cdot \E\bigg[(1-\logit^{(t)}_{5,j_2})\big| \tau(x_1)=s, \cE_1\bigg].
    \end{align*}
\end{enumerate}


\end{proof}

\begin{lemma}\label{lem-s21-grad-2-non}
    If \Cref{induction-s21-non} holds for all iterations $<t$, given $s\in\tau(\X)$,  for $[\Qb_{4,p}]_{s,s'}$, $p\in\{3,4\}$, $s'\not=s\in\tau(\X)$,  we have
    \begin{align*}
        \bigg|\Big[-\nabla_{\Q^{(t)}_{4,p}}{\Loss^{2,2}_{5}}\Big]_{s,s'}\bigg| \leq O\Big(\frac{1}{d}\Big) \bigg|\Big[-\nabla_{\Q^{(t)}_{4,p}}{\Loss^{2,2}_{5}}\Big]_{s,s}\bigg| .
    \end{align*}
  \end{lemma}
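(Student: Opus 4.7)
}
The plan is to write both sides of the claimed inequality using the refined gradient expressions in \Cref{lem-refined-grad-Q43-sym} and \Cref{lem-refined-grad-Q44-sym}, and then show that the off-diagonal gradient is simply a ``shifted and downsampled'' version of the diagonal gradient, where the downsampling comes from the extra indicator $\1_{\tau(x_0)=s'}$. Concretely, the diagonal entries $[\Qb_{4,p}]_{s,s}$ involve an expectation conditioned on $\{\tau(x_1)=s\}$ (for $p=3,4$), whereas the off-diagonal entries $[\Qb_{4,p}]_{s,s'}$ involve the stricter conditioning $\{\tau(x_1)=s,\tau(x_0)=s'\}$. Since $x_0,x_1$ are sampled uniformly without replacement from $\cX$ with $|\cX|=\Theta(d)$, switching from the marginal to the joint event contributes exactly a multiplicative factor $\Theta(1/d)$ in the probability weight.

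The first step is to write, for $p=3$, the off-diagonal gradient with the attention factor $\attn_{\ans,1\to \pred,1}$ and the ``feature slot'' $V_{j,r}(g_1)$ (together with the corresponding $\Lambda$-correction), and compare term-by-term to the diagonal version, which uses $\attn_{\ans,1\to \pred,2}$ and $V_{j,r}(g_2)$. By symmetry of the data distribution (the law of $(g_1,y_0,g_2,y_1)$ is invariant under swapping the roles of the two predicate clauses once we condition on the variable positions), the $V$-terms contribute quantities of the same order. For $p=4$, the same comparison works between $\attn_{\ans,1\to\ans,0}$ vs.\ $\attn_{\ans,1\to\ans,1}$ and $V_{j,r}(y_0)$ vs.\ $V_{j,r}(y_1)$. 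Second, I would use \Cref{induction-s21-non}(c), which gives $\attn^{(t)}_{\ans,1\to\pred,2}-\attn^{(t)}_{\ans,1\to\pred,1}\ge -O(\log\log d/\log d)$, together with the corresponding statement $|\attn^{(t)}_{\ans,1\to\pred,1}-\attn^{(t)}_{\ans,1\to\ans,0}|\le \tilde O(1/d)$ from \Cref{lem-s21-attn-non}, to conclude that the two attention factors differ at most by a constant multiplicative factor throughout Stage~1.2.1. Hence, up to constants, the per-sample integrand driving the off-diagonal gradient is bounded by that driving the diagonal gradient.

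Third, I would handle the $\cN_{s,p,2,\rom3}$ term (the $j\notin\tau(\cY)$ residual) separately and show it is $1/\poly(d)$ on both sides, so it does not affect the ratio. For the principal terms $\cN_{s,p,2,\rom1}$ and $\cN_{s,p,2,\rom2}$, the same case decomposition over $\cE_1,\ldots,\cE_6$ used in \Cref{lem-s21-gd1-non} applies; on each event, both the diagonal and off-diagonal integrands have the \emph{same} sign structure (because the logit and $V$-term factors do not depend on whether the extra indicator $\1_{\tau(x_0)=s'}$ is imposed, once we use the symmetry of the data distribution on $\cX$). Consequently, the absolute value of the off-diagonal gradient is bounded by $O(1/d)$ times the absolute value of the diagonal gradient plus a $1/\poly(d)$ error absorbed into the final $O(1/d)$ multiplicative bound.

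The main obstacle will be ensuring that the lower bound $\Omega(B/(d\cdot n_y))$ on the magnitude of the diagonal gradient established in \Cref{lem-s21-gd1-non} does not get cancelled when we take absolute values: both sides of the inequality use $|\cdot|$, so one must verify that the off-diagonal integrand admits a uniform absolute upper bound of order $B/\log d$ (or tighter) matching the per-sample bound on the diagonal integrand. This requires carefully combining the per-event $\Lambda$-estimates in \Cref{lem-s21-lambda-1-sym}--\Cref{lem-s21-lambda-6-sym} with the $\tilde O(1/d)$ attention-gap bound, which is exactly where \Cref{induction-s21-non}(c)--(d) is used. Once this per-sample comparison is in place, taking expectations and noting the $1/d$ probability factor from the joint $\{\tau(x_0)=s',\tau(x_1)=s\}$ event yields the desired $O(1/d)$ ratio for both $p=3$ and $p=4$.
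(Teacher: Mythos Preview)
Your proposal is correct and captures the same mechanism the paper relies on. The paper does not actually write out a proof for this particular lemma; the analogous statement in the simply-transitive section (\Cref{lem-s21-gd5}) is dispatched in one line: the off-diagonal expression in \Cref{lem-refined-grad-Q43-sym}/\Cref{lem-refined-grad-Q44-sym} carries the joint indicator $\1_{\tau(x_1)=s,\,\tau(x_0)=s'}$, which has probability $O(1/d^2)$, against $\1_{\tau(x_1)=s}$ with probability $\Theta(1/d)$ for the diagonal, and the per-sample integrand is $O(B)$ in both cases. Your plan is a more careful version of the same argument, additionally tracking that the ``wrong'' attention factors $\attn_{\ans,1\to\pred,1}$, $\attn_{\ans,1\to\ans,0}$ are within constant multiplicative factors of the ``right'' ones $\attn_{\ans,1\to\pred,2}$, $\attn_{\ans,1\to\ans,1}$ under \Cref{induction-s21-non}, and that the $V$-slots match by the exchangeability of $(g_1,g_2)$ and $(y_0,y_1)$ in the data law. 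One small caution: the lower bound you cite from \Cref{lem-s21-gd1-non} is for the \emph{sum} of the $p=3$ and $p=4$ diagonal gradients, not for each $p$ separately, so if you want the ratio bound literally for each $p$ you should instead use the absolute $O(B/d^2)$ upper bound on the off-diagonal together with the crude $O(B/d)$ scale of the diagonal; the paper's usage of the lemma (to control the accumulated off-diagonal entries in \Cref{induction-s21-non}(b)) only needs this weaker comparison anyway.
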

  \subsubsection{Bounded Decrease of Attention to Related Context Clause}
  \begin{lemma}\label{lem-s21-attention-decrease-non}
     If \Cref{induction-s21-non} holds for all iterations $<t$, then for any sample $\Zb^{2,1}$, we have 
    \begin{align*}
      \attn^{(t)}_{{\ans,1} \rightarrow \pred,1} - \attn^{(t)}_{{\ans,1} \rightarrow \pred,2}\geq -O\big(\frac{\log d\log d
      }{\log d}\big).
    \end{align*}
  \end{lemma}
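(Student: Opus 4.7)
The stated bound $-O\!\big(\tfrac{\log d\,\log d}{\log d}\big) = -O(\log d)$ is in fact vacuous, since every softmax attention weight lies in $[0,1]$ and so the difference on the left lies in $[-1,1]$. Nevertheless, the substantive content — and what is used in the downstream analysis of $\epsilon^{2,2}_{\mathsf{attn}}$ and $\Delta^{2,2}$ — is a much sharper pointwise bound that already follows directly from \Cref{induction-s21-non}. My plan is to derive the sharper bound $\attn^{(t)}_{\ans,1\to\pred,1} - \attn^{(t)}_{\ans,1\to\pred,2} \leq O(1/d)$, which a fortiori implies the stated inequality.

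First, I will use the block-sparsity of $\Qb$ from \Cref{assump-Q-structure}, together with the fact that the third token of $\Zb_{\pred,\ell}$ carries $e_{x_{\ell-1}}$ and the fourth token of $\Zb_{\ans,1}$ carries $e_{x_1}$, to write the two relevant softmax scores as
\begin{align*}
\Zb_{\ans,1}^{\top}\Qb^{(t)}\Zb_{\pred,2} &= \big[\Qb^{(t)}_{4,3}\big]_{\tau(x_1),\tau(x_1)}, \\
\Zb_{\ans,1}^{\top}\Qb^{(t)}\Zb_{\pred,1} &= \big[\Qb^{(t)}_{4,3}\big]_{\tau(x_1),\tau(x_0)}.
\end{align*}
Writing $s=\tau(x_1)$ and $s'=\tau(x_0)$, and letting $Z^{(t)}$ denote the common softmax normalizer, the target quantity becomes
\[
\attn^{(t)}_{\ans,1\to\pred,2} - \attn^{(t)}_{\ans,1\to\pred,1}
= \frac{\exp\!\big([\Qb^{(t)}_{4,3}]_{s,s}\big) - \exp\!\big([\Qb^{(t)}_{4,3}]_{s,s'}\big)}{Z^{(t)}}.
\]

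Next I will plug in the pointwise structure of $\Qb^{(t)}$ granted by \Cref{induction-s21-non}: item~(a) gives $0 \leq [\Qb^{(t)}_{4,3}]_{s,s} + [\Qb^{(t)}_{4,4}]_{s,s} \leq O(1)$, which since both diagonals are monotonically non-decreasing from $0$ forces each individually to lie in $[0, O(1)]$; item~(b) gives $|[\Qb^{(t)}_{4,3}]_{s,s'}| \leq O\!\big([\Qb^{(t)}_{4,3}]_{s,s}/d\big) \leq O(1/d)$. The mean-value estimate $|e^x-e^y| \leq e^{\max(x,y)}|x-y|$ together with the pointwise lower bound $Z^{(t)} \geq \Omega(1)$ (also from the $O(1)$ boundedness of the four active entries) then yields
\[
\attn^{(t)}_{\ans,1\to\pred,2} - \attn^{(t)}_{\ans,1\to\pred,1} \geq -O(1/d),
\]
so $\attn^{(t)}_{\ans,1\to\pred,1} - \attn^{(t)}_{\ans,1\to\pred,2} \leq O(1/d)$, and certainly $\geq -1 \geq -O(\log d)$.

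The only subtlety worth flagging is that the claim must hold for \emph{every} sample $\Zb^{2,1}$, not just in expectation. This is automatic: items~(a) and~(b) of \Cref{induction-s21-non} are uniform pointwise statements on the entries of $\Qb^{(t)}$, which are shared across samples; the sample $\Zb^{2,1}$ only enters through the choice of $(s,s')$. There is no real analytic obstacle for this lemma — the ``main obstacle'' in this subsection is actually the companion direction, namely the induction step \Cref{induction-s21-non}(c) $\attn^{(t)}_{\ans,1\to\pred,2} - \attn^{(t)}_{\ans,1\to\pred,1} \geq -O(\log\log d/\log d)$, which requires the full gradient analysis of \Cref{lem-s21-gd1-non} and \Cref{lem-s21-grad-2-non} to rule out pathological regimes where $[\Qb^{(t)}_{4,3}]_{s,s}$ temporarily shrinks relative to off-diagonals during the coupled growth of $\Qb_{4,3}$ and $\Qb_{4,4}$.
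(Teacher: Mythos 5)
You are right that, read literally, the displayed bound $-O\big(\tfrac{\log d\,\log d}{\log d}\big)=-O(\log d)$ is vacuous; the numerator is a typo for $\log\log d$, and comparing with \Cref{induction-s21-non}(c) and with the closing line of the paper's own argument (``cannot be larger than $O(\log\log d/\log d)$'') shows the intended content is the \emph{upper} bound $\attn^{(t)}_{\ans,1\to\pred,1}-\attn^{(t)}_{\ans,1\to\pred,2}\leq O(\log\log d/\log d)$, i.e.\ exactly the induction item this subsubsection must propagate. Your identification of the two scores as $[\Qb^{(t)}_{4,3}]_{s,s}$ versus $[\Qb^{(t)}_{4,3}]_{s,s'}$ is correct.

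The gap is the step ``both diagonals are monotonically non-decreasing from $0$, hence $[\Qb^{(t)}_{4,3}]_{s,s}\geq 0$.'' \Cref{induction-s21-non}(a) only controls the \emph{sum} $[\Qb^{(t)}_{4,3}]_{s,s}+[\Qb^{(t)}_{4,4}]_{s,s}$; unlike the simply transitive case (\Cref{induction-s21-cyc}(a)), no individual sign or monotonicity of $[\Qb^{(t)}_{4,3}]_{s,s}$ is assumed here. Indeed its gradient has genuinely negative contributions (e.g.\ on $\cE_2$), and \Cref{lem-s21-end-non} needs a separate contradiction argument merely to get $[\Qb_{4,3}]_{s,s}\geq\Omega(1)$ at the end of the stage. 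If $[\Qb^{(t)}_{4,3}]_{s,s}$ drifted to $-\epsilon$ while the off-diagonal stays $\tilde O(1/d)$, then $\attn^{(t)}_{\ans,1\to\pred,1}-\attn^{(t)}_{\ans,1\to\pred,2}\approx \epsilon\cdot\attn^{(t)}_{\ans,1\to\pred,2}$, which is precisely what must be excluded. So the scenario you flag in your last paragraph as ``requiring the full gradient analysis''---$[\Qb_{4,3}]_{s,s}$ shrinking relative to the off-diagonals---is not a companion direction: it is the very inequality you claim to dispatch by softmax algebra, making your derivation circular at that point. The paper closes it with a barrier argument: at the first time $\tilde T$ the gap hits the threshold, it decomposes $[-\nabla_{\Qb_{4,3}}\Loss^{2,2}_5]_{s,s}$ over $\cE_1,\dots,\cE_6$, shows the positive contribution from $\cE_3$ dominates the negative one from $\cE_2$ (using \Cref{lem-s21-lambda-2-sym} and \Cref{lem-s21-lambda-3-sym} to control the logits), so $[\Qb_{4,3}]_{s,s}$ grows while the off-diagonals are frozen by \Cref{lem-s21-grad-2-non}, forcing the gap back below the threshold. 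That argument is the ingredient missing from your proposal.
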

  \begin{proof}
    Denote the first time that $\E\Big[\attn^{(t)}_{{\ans,1} \rightarrow \pred,1} - \attn^{(t)}_{{\ans,1} \rightarrow \pred,2}\big| \tau(x_1)=s\Big]\leq -\Omega\big(\frac{\log d\log d
    }{\log d}\big)$ as $\tilde{T}$. Notice that $\left|\left[\mathbf{Q}_{4, p}^{(t)}\right]_{s, s^{\prime}}\right| \leq O\left(\frac{\left[\mathbf{Q}_{4, p}^{(t)}\right]_{s, s}}{d}\right)$ for $p\in \{3,4\}$, thus for any sample $\Zb^{2,1}$ satisfying $\tau(x_1)=s$, at time $\tilde{T}$, we have 
    \begin{align*}
        \attn^{(\tilde{T})}_{{\ans,1} \rightarrow \pred,1} - \attn^{(\tilde{T})}_{{\ans,1} \rightarrow \pred,2}\leq -\Omega\Big(\frac{\log d\log d
        }{\log d}\Big).
      \end{align*}
      Based on the the gradient compositions from \Cref{lem-grad-decompositions-sym}, we have $\Big[-\nabla_{\mathbf{Q}_{4,3}} \operatorname{Loss}_5^{2,2}\Big]_{s, s}=\mathbb{E}\Big[\mathcal{N}_{s, 3,2, i}+\mathcal{N}_{s, 3,2, i i}+\mathcal{N}_{s, 3,2, i i i}\Big]$, and we will discuss  $\cN_{s,3,2,\kappa}$ for $\kappa\in\{\rom1,\rom2, \rom3\}$ on different samples $\Zb^{2,1}$. 
    Following the similar argument as \Cref{lem-s21-gd1-non},  we can first directly bound the term $\cN^{(\tilde{T})}_{s,3,2,\rom3}$ as follows:
\begin{align*}
  \E\bigg[ \cN^{(\tilde{T})}_{s,3,2,\rom3}\bigg] \leq \tilde{O}(\sigma_0^q)=\frac{1}{\poly d}.
\end{align*}
      \begin{itemize}
        \item for $\Zb^{2,1}\in\cE_1$, at time $\tilde{T}$, since the neurons for predicting $j_2$ cannot be activated, and $\logit^{(\tilde{T})}_{5,j}\leq O(\frac{1}{d})$ for $j\neq j_2$, thus we can naively bound the gradient on the event $\cE_1$ as follows: 
        \begin{align*}
            \Big| \sum_{\kappa\in\{\rom1,\rom2\}}\E\big[\cN_{s,3,2,\kappa}^{(\tilde{T})}\1_{\cE_1}\big]\Big|\leq O\Big(\frac{n_y B}{d^2}\Big)+\tilde{O}(\delta^q). 
        \end{align*}
        \item for $\Z^{2,1}\in\cE_2$, similarly, $\Lambda_{5,j_2,r}^{(t)}$ is not activated, and thus we can focus on the term $\cN_{s,3,2,\rom2}$, and specifically, the prediction of $j_2^{'}=\tau(g_2(y_0))$. By \eqref{eq-def-N-s-3-2-2-sym} and \Cref{lem-s21-lambda-2-sym}, we have 
        \begin{align}
            &\E\bigg[ \cN^{(\tilde{T})}_{s,3,2,\rom2} \1_{\cE_2}\bigg] \geq-\E\bigg[ \attn^{(\tilde{T})}_{\ans,1\to \pred,2}\notag\\
            &\cdot \Big(1+\attn^{(\tilde{T})}_{\ans,1\to \ans,1}-\attn^{(\tilde{T})}_{\ans,1\to \pred,2}-\attn^{(\tilde{T})}_{\ans,1\to \pred,1}\Big)\mid \tau(x_1)=s\bigg] \notag\\
            &\cdot \E\bigg[\logit^{(\tilde{T})}_{5,j'_2}\mid \tau(x_1)=s, \cE_2\bigg]\frac{2B}{n_y\cdot d} \pm 
            \tilde{O}(\sigma_0^q). \label{eq-bound-neg}
        \end{align}
        \item for $\Zb^{2,1}\in\cE_3$, by \Cref{lem-s21-lambda-3-sym}, we can mainly focus on the term $\cN^{(\tilde{T})}_{s,3,2,\rom1}$, and the prediction of $\tau(g_1(y_1))$ in $\cN^{(\tilde{T})}_{s,3,2,\rom2}$ since $\logit^{(\tilde{T})}_{5,\tau(g_1(y_1))}=1-O\big(\frac{1}{\log d}\big)$. Hence, we have 
     \begin{align}
            &\E\bigg[\Big(\cN^{(\tilde{T})}_{s,3,2,\rom2} + \cN^{(\tilde{T})}_{s,3,2,\rom2}\Big) \1_{\cE_3}\bigg] \geq \E\bigg[ \attn^{(\tilde{T})}_{\ans,1\to \pred,2} \label{eq-bound-pos}\\
            &\cdot \Big(1+\attn^{(\tilde{T})}_{\ans,1\to \pred,1}-\attn^{(\tilde{T})}_{\ans,1\to \pred,2}\Big)\mid \tau(x_1)=s\bigg]\cdot\Big(1-O\big(\frac{1}{\log d}\big)\Big)\frac{2B}{n_y\cdot d} \pm 
            \tilde{O}(\sigma_0^q).\notag
        \end{align}
        \item for $\Zb^{2,1}\in\cE_4$, the negative gradient can be bounded in the same way as \eqref{eq-bound-neg}, however, the probability of $\cE_4$ is order-wise smaller than $\cE_2$ and $\cE_3$, which can be neglected. Moreover, for for $\Zb^{2,1}\in\cE_5\cup\cE_6$, the overall gradient is also negeligble since $\logit^{(t)}_{5,j_2}$ is very close to $1$.
      \end{itemize}
      Putting it all together, we have 
      \begin{align*}
     &\Big[-\nabla_{\Q^{(\tilde{T})}_{4,3}}{\Loss^{2,2}_{5}}\Big]_{s,s}\geq -\E\bigg[ \attn^{(\tilde{T})}_{\ans,1\to \pred,2}\notag\\
     &\cdot \Big(1+\attn^{(\tilde{T})}_{\ans,1\to \ans,1}-\attn^{(\tilde{T})}_{\ans,1\to \pred,2}-\attn^{(\tilde{T})}_{\ans,1\to \pred,1}\Big)\mid \tau(x_1)=s\bigg] \notag\\
     &\cdot \E\bigg[\logit^{(\tilde{T})}_{5,j'_2}\mid \tau(x_1)=s, \cE_2\bigg]\frac{B}{n_y\cdot d} \\
   &+  \E\bigg[ \attn^{(\tilde{T})}_{\ans,1\to \pred,2} \\
            &\cdot \Big(1+\attn^{(\tilde{T})}_{\ans,1\to \pred,1}-\attn^{(\tilde{T})}_{\ans,1\to \pred,2}\Big)\mid \tau(x_1)=s\bigg]\cdot\Big(1-O\big(\frac{1}{\log d}\big)\Big)\frac{2B}{n_y\cdot d} \pm 
            \tilde{O}(\sigma_0^q).
    \end{align*}
    Notice that 
    \begin{align*}
        &\Big(1+\attn^{(\tilde{T})}_{\ans,1\to \pred,1}-\attn^{(\tilde{T})}_{\ans,1\to \pred,2}\Big)\\
        &-\Big(1+\attn^{(\tilde{T})}_{\ans,1\to \ans,1}-\attn^{(\tilde{T})}_{\ans,1\to \pred,2}-\attn^{(\tilde{T})}_{\ans,1\to \pred,1}\Big)\\
        &= 2\attn^{(\tilde{T})}_{\ans,1\to \pred,1}-\attn^{(\tilde{T})}_{\ans,1\to \ans,1}.
    \end{align*}
If $2\attn^{(\tilde{T})}_{\ans,1\to \pred,1}-\attn^{(\tilde{T})}_{\ans,1\to \ans,1}<c$ for some small constant $c>0$, s.t., $\frac{cB}{\log d}<1$,   then $\Lambda_{5,j'_2,r_{g_2\cdot y_0}}^{(\tilde{T})}\leq cB$. Hence $\logit^{(\tilde{T})}_{5,j'_2}\leq \frac{1}{d^{1-\frac{cB}{\log d}}}=o(1)$, and \eqref{eq-bound-neg} is dominated by the positive term \eqref{eq-bound-pos}. Else, clearly, \eqref{eq-bound-neg} can be cancelled out by the positive term \eqref{eq-bound-pos}. Therefore, 
\begin{align*}
    \Big[-\nabla_{\Q^{(\tilde{T})}_{4,3}}{\Loss^{2,2}_{5}}\Big]_{s,s}\geq \Omega\Big(\frac{B}{n_y\cdot d}\Big).
\end{align*}
As a consequence, together with the growth of $\Q^{(\tilde{T})}_{4,3}+\Q^{(\tilde{T})}_{4,4}$, and nearly no change of $[\Qb^{(t)}_{4,p}]_{s,s'}$ in \Cref{lem-s21-grad-2-non}, we have $\attn^{(t)}_{{\ans,1} \rightarrow \pred,1} - \attn^{(t)}_{{\ans,1} \rightarrow \pred,2}$  must start to decrease and cannot be larger than $O\big(\frac{\log\log d}{\log d}\big)$.
  \end{proof}

  \subsubsection{Attention gap is small}
\begin{lemma}\label{lem-s21-attention-gap-non}
    If \Cref{induction-s21-non} holds for all iterations $<t$, then for any sample $\Zb^{2,1}$, we have 
   \begin{align*}
     \attn^{(t)}_{{\ans,1} \rightarrow \pred,2} - \attn^{(t)}_{{\ans,1} \rightarrow \ans,1}\leq c_1,
   \end{align*}
   where $c_1>0$ is a small constant.
 \end{lemma}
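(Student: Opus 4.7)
The plan is to argue by contradiction, exploiting a feedback mechanism from confounding events $\cE_2, \cE_3, \cE_4$: if the gap $\attn^{(t)}_{\ans,1\to\pred,2} - \attn^{(t)}_{\ans,1\to\ans,1}$ tries to grow past a small constant $c_1$, then on these events an incorrect logit inflates, producing a negative gradient on $[\Qb_{4,3}]_{s,s}$ and a positive gradient on $[\Qb_{4,4}]_{s,s}$ that together push the gap downward. Concretely, let
\[
   \tilde{T} \,:=\, \min\Big\{t : \E\big[\attn^{(t)}_{\ans,1\to\pred,2} - \attn^{(t)}_{\ans,1\to\ans,1} \,\big|\, \tau(x_1)=s\big] \geq c_1\Big\},
\]
and using \Cref{induction-s21-non}(c) together with \Cref{lem-s21-grad-2-non} (off-diagonals are $O(1/d)$-times smaller than diagonals), at $\tilde{T}$ we also have the pointwise bound $\attn^{(\tilde{T})}_{\ans,1\to\pred,2} - \attn^{(\tilde{T})}_{\ans,1\to\ans,1} \geq c_1 - O(\log\log d/\log d)$ on $\tau(x_1)=s$. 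I would then show $\big[-\nabla_{\Q_{4,3}}\Loss_5^{2,2}\big]_{s,s} - \big[-\nabla_{\Q_{4,4}}\Loss_5^{2,2}\big]_{s,s} \leq 0$ at $\tilde{T}$, contradicting $\tilde{T}$ being the first crossing.

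To establish this gradient comparison, I would decompose over $\cE_1,\dots,\cE_6$ using \Cref{lem-grad-decompositions-sym} and the refined expressions in \Cref{lem-refined-grad-Q43-sym,lem-refined-grad-Q44-sym}. On the main event $\cE_1$, applying \Cref{lem-s21-lambda-1-sym} and the feature symmetry $V_{j_2,r}(g_2) = C_\alpha V_{j_2,r}(y_1) \pm O(\delta)$ with $C_\alpha = \Theta(n_y)$ from \Cref{lem-prop-psi-sym}, the net $\Q_{4,3}$-vs-$\Q_{4,4}$ difference from the correct target $j_2$ contributes a bounded positive amount of order $\tfrac{1}{d}(1-\logit_{5,j_2})\cdot B$, with the key cancellation that the ReLU derivative is already in the linear regime and the dependence on $\attn_{\ans,1\to\pred,2}/\attn_{\ans,1\to\ans,1}$ enters only through the common factor $\Lambda_{5,j_2,r_{g_2\cdot y_1}}$. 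On the confounding events $\cE_2$ and $\cE_4$, the incorrect target $j'_2=\tau(g_2(y_0))$ has $\Lambda_{5,j'_2,r_{g_2\cdot y_0}} = (\attn_{\ans,1\to\pred,2}+\attn_{\ans,1\to\pred,1}-\attn_{\ans,1\to\ans,1})\cdot 2B$ by \Cref{lem-s21-lambda-2-sym,lem-s21-lambda-4-sym}; when the gap crosses $c_1$ this pre-activation exceeds $\Omega(c_1 B)$, making $\logit^{(\tilde{T})}_{5,j'_2} \geq \Omega(d^{-1+c_1C_B})$. Since $V_{j'_2,r_{g_2\cdot y_0}}(g_2) > 0$ and $V_{j'_2,r_{g_2\cdot y_0}}(y_1) < 0$ for $y_1 \neq y_0$, this produces a strictly \emph{negative} contribution to $\big[-\nabla_{\Q_{4,3}}\Loss_5^{2,2}\big]_{s,s}$ (proportional to $\attn_{\ans,1\to\pred,2}\cdot B$) and a strictly \emph{positive} contribution to $\big[-\nabla_{\Q_{4,4}}\Loss_5^{2,2}\big]_{s,s}$ (proportional to $\attn_{\ans,1\to\ans,1}\cdot B$). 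An analogous effect occurs on $\cE_3$ via the prediction $\tau(g_1(y_1))$, where $\Lambda_{5,\tau(g_1(y_1)),r_{g_1\cdot y_1}} = \attn_{\ans,1\to\pred,1}\cdot 2B$ is already constant order and its logit swaps sign of the $\Qb_{4,3}$ versus $\Qb_{4,4}$ contribution.

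The main obstacle is quantifying the \emph{tradeoff} between the positive contribution from $\cE_1$ (occurring with probability $1-O(1/n_y)$) and the negative contribution from $\cE_2 \cup \cE_4$ (probability $\Theta(1/n_y) + \Theta(1/n_y^2)$) plus $\cE_3$ (probability $\Theta(1/n_y)$). The saving grace is that on $\cE_1$ the target-logit factor $(1-\logit_{5,j_2})$ is uniformly small once the gap is positive (because $\Lambda_{5,j_2,r_{g_2\cdot y_1}}$ scales linearly with $\attn_{\ans,1\to\pred,2} - \attn_{\ans,1\to\ans,0}$), so the positive $\cE_1$ contribution decays as the gap grows, while the confounding logits on $\cE_2,\cE_3,\cE_4$ grow monotonically with the gap. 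Choosing $c_1$ small enough so that $c_1 C_B < 1$ (so that confounding logits are polynomially bounded but non-negligible) and large enough so that $(1-\logit_{5,j_2})$ on $\cE_1$ is $o(1/n_y)$ relative to the confounding contribution, the net gradient difference flips sign exactly at the threshold, yielding the required contradiction. A precise choice of $c_1$ will follow by balancing these two polynomial-in-$d$ exponents, analogous to the calculation in \Cref{lem-s21-attention-decrease-non}, and the overall claim will propagate from the expected bound to the pointwise bound via \Cref{induction-s21-non}(c) and \Cref{lem-s21-grad-2-non}.
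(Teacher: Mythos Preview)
Your contradiction skeleton (define $\tilde{T}$ as the first crossing, show the gap-gradient flips sign) matches the paper, but you have misidentified where the feedback comes from, and the balance you propose does not close.

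The paper does \emph{not} rely on the rare events $\cE_2,\cE_3,\cE_4$. It works entirely on the main event $\cE_1$. On $\cE_1$ itself, by \Cref{lem-s21-lambda-1-sym}, the wrong prediction $j'_2=\tau(g_2(y_0))$ already has
\[
  \Lambda^{(t)}_{5,j'_2,r_{g_2\cdot y_0}}
  = \big(\attn^{(t)}_{\ans,1\to\pred,2}-\attn^{(t)}_{\ans,1\to\ans,1}\big)\cdot 2B + \tilde{O}\!\big(\tfrac{B}{d n_y}\big),
\]
so once the gap exceeds $c_1=\tfrac{1.0005\log d}{2B}$ this pre-activation exceeds $1.0005\log d$ and hence $\logit^{(\tilde T)}_{5,j'_2}=\big(1-O(d^{-0.0005})\big)\big(1-\logit^{(\tilde T)}_{5,j_2}\big)$. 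That near-equality makes the $j_2$ and $j'_2$ contributions almost cancel in the $\Qb_{4,3}$ direction (both have $V_{\cdot}(g_2)\approx 2B$), while in the $\Qb_{4,4}$ direction the $j'_2$ term contributes the large positive quantity $-V_{j'_2,r_{g_2\cdot y_0}}(y_1)\approx 2B$ (since $y_1\neq y_0$). This yields $[-\nabla_{\Qb_{4,4}}\Loss]_{s,s}-[-\nabla_{\Qb_{4,3}}\Loss]_{s,s}\geq \Omega(B/d)\cdot\E[1-\logit_{5,j_2}\mid\tau(x_1)=s]$ directly from $\cE_1$, with no $1/n_y$ balancing needed. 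Your treatment of $\cE_1$ keeps only ``the correct target $j_2$'' and overlooks this $j'_2$ term in $\cN_{s,\cdot,2,\rom2}$, which is precisely the mechanism.

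Separately, your proposed balance has a concrete flaw. You argue that on $\cE_1$ the factor $(1-\logit_{5,j_2})$ becomes small ``once the gap is positive'' because $\Lambda_{5,j_2,r_{g_2\cdot y_1}}$ scales with $\attn_{\ans,1\to\pred,2}-\attn_{\ans,1\to\ans,0}$. But that is not the gap $\attn_{\ans,1\to\pred,2}-\attn_{\ans,1\to\ans,1}$: in stage~1.2.1 we have $\ate\geq 0.4$, so $\attn_{\ans,1\to\ans,0}\approx 0.2$ can be far from $\attn_{\ans,1\to\ans,1}$, and there is no reason $(1-\logit_{5,j_2})$ should be $o(1/n_y)$ just because the gap crossed $c_1$. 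Without that, the positive $\cE_1$ term (probability $\sim 1$) is not dominated by your rare-event contributions (probability $\Theta(1/n_y)$), and the contradiction does not go through as written.
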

 \begin{proof}
    Let $\tilde{T}$ denote the first time $\E\bigg[\attn^{(t)}_{\ans,1\to\pred,2}-\attn^{(t)}_{\ans,1\to\ans,1}\big|\tau(x_1)=s\bigg]\geq \frac{1.0005\log d}{2B}$. Notice that $\left|\left[\mathbf{Q}_{4, p}^{(t)}\right]_{s, s^{\prime}}\right| \leq O\left(\frac{\left[\mathbf{Q}_{4, p}^{(t)}\right]_{s, s}}{d}\right)$ for $p\in \{3,4\}$, thus for any sample $\Zb^{2,1}$ satisfying $\tau(x_1)=s$, at time $\tilde{T}$, we have  
    \begin{align*}
        \attn^{(\tilde{T})}_{\ans,1\to\pred,2}-\attn^{(\tilde{T})}_{\ans,1\to\ans,1}\geq \frac{1.0005\log d}{2B}.
    \end{align*}
    Based on \Cref{lem-grad-decompositions-sym}, we will discuss the gradients of $\Qb_{4,3}$ and $\Qb_{4,4}$ on different samples $\Zb^{2,1}$.  Since $\Lambda^{(\tilde{T})}_{5,j_2,r_{g_2\cdot y_1}}$, $\Lambda^{(\tilde{T})}_{5,j'_2,r_{g_2\cdot y_0}}$ is guaranteed to be activated to the linear regime for $\Zb^{2,1}\in\cE_1$, we only need to focus on the main event $\cE_1$. 
    
    From \Cref{lem-s21-lambda-1-sym}, at time $\tilde{T}$, we have 
    \begin{align*}
        \logit^{(\tilde{T})}_{5,j_2^{\prime}}= \frac{e^{\frac{1.0005\log d}{2B}\cdot 2B}}{e^{\frac{1.0005\log d}{2B}\cdot 2B}+O(d)}\Big(1-\logit^{(\tilde{T})}_{5,j_2}\Big)=\Big(1-O\big({1}/{d^{0.0005}}\big)\Big)\Big(1-\logit^{(\tilde{T})}_{5,j_2}\Big).
    \end{align*}
Then by \Cref{lem-grad-decompositions-sym}, we can obtain the gradient on the event $\cE_1$ as follows:
    \begin{align*}
       &\E\bigg[ \Big(\cN^{(\tilde{T})}_{s,3,2,\rom 1}+\cN^{(\tilde{T})}_{s,3,2,\rom 2}+\cN^{(\tilde{T})}_{s,3,2,\rom 3}\Big)\1_{\cE_1}\bigg]\\
        &= \E\Bigg[
        \attn^{(\tilde{T})}_{{\ans,1} \rightarrow \pred,2} \cdot  \bigg( \big(1-\logit^{(\tilde{T})}_{5,j_2}\big)\cdot  \Big(  V_{j_2,  r_{g_2\cdot y_1}}(g_2)- \Lambda^{(\tilde{T})}_{5,j_2,r_{g_2\cdot y_1}} \pm \tilde{O}(\sigma_0)\Big)\\
            &~~~~~~~~-\bigg(1-O\Big(\frac{1}{d^{0.0005}}\Big)\bigg)(1-\logit^{(t)}_{5,j_2})\cdot  \Big( V_{j'_2,  r_{g_2\cdot y_0}}(g_2)- \Lambda^{(\tilde{T})}_{5,j_2,r_{g_2\cdot y_0}} \pm \tilde{O}(\sigma_0)\Big)\\
            &~~~~~~~~~~\pm\tilde{O}(\sigma^q_0)\bigg)\1_{\tau(x_1)=s}\1_{\cE_1}\Bigg]\\
            &= \E\Bigg[
            \attn^{(\tilde{T})}_{{\ans,1} \rightarrow \pred,2} \cdot  \bigg( \Big(1-O\Big(\frac{1}{d^{0.0005}}\Big)\Big)(1-\logit^{(\tilde{T})}_{5,j_2})\cdot  \Big( \Lambda^{(\tilde{T})}_{5,j^{\prime}_2,r_{g_2\cdot y_0}} - \Lambda^{(\tilde{T})}_{5,j_2,r_{g_2\cdot y_1}} \Big)\\
            &~~~~~~~~+O\Big(\frac{1}{d^{0.0005}}\Big)(1-\logit^{(\tilde{T})}_{5,j_2})\cdot  \Big(  V_{5,j_2,  r_{g_2\cdot y_1}, 2}(g_2)- \Lambda^{(\tilde{T})}_{5,j_2,r_{g_2\cdot y_1}} \Big)\bigg)\1_{\tau(x_1)=s}\1_{\cE_1}\Bigg],
    \end{align*}
    on the other hand, we have  
    \begin{align*}
        &\E\bigg[ \Big(\cN^{(\tilde{T})}_{s,4,2,\rom 1}+\cN^{(\tilde{T})}_{s,4,2,\rom 2}+\cN^{(\tilde{T})}_{s,4,2,\rom 3}\Big)\1_{\cE_1}\bigg]\\
         &= \E\Bigg[
         \attn^{(\tilde{T})}_{{\ans,1} \rightarrow \pred,2} \cdot  \bigg( \big(1-\logit^{(\tilde{T})}_{5,j_2}\big)\cdot  \Big(  V_{j_2,  r_{g_2\cdot y_1}}(y_1)- \Lambda^{(\tilde{T})}_{5,j_2,r_{g_2\cdot y_1}} \pm \tilde{O}(\sigma_0)\Big)\\
             &~~~~~~~~-\bigg(1-O\Big(\frac{1}{d^{0.0005}}\Big)\bigg)(1-\logit^{(t)}_{5,j_2})\cdot  \Big( V_{j'_2,  r_{g_2\cdot y_0}}(y_1)- \Lambda^{(\tilde{T})}_{5,j_2,r_{g_2\cdot y_0}} \pm \tilde{O}(\sigma_0)\Big)\\
             &~~~~~~~~~~\pm\tilde{O}(\sigma^q_0)\bigg)\1_{\tau(x_1)=s}\1_{\cE_1}\Bigg]\\
             &= \E\Bigg[
             \attn^{(\tilde{T})}_{{\ans,1} \rightarrow \pred,2} \cdot  \bigg( \Big(1-O\Big(\frac{1}{d^{0.0005}}\Big)\Big)(1-\logit^{(\tilde{T})}_{5,j_2})\\
             &~~~~~\cdot  \Big( -V_{j'_2,  r_{g_2\cdot y_0}}(y_1)+\Lambda^{(\tilde{T})}_{5,j^{\prime}_2,r_{g_2\cdot y_0}} - \Lambda^{(\tilde{T})}_{5,j_2,r_{g_2\cdot y_1}} \Big)\\
             &~~~~~~~~-O\Big(\frac{1}{d^{0.0005}}\Big)(1-\logit^{(\tilde{T})}_{5,j_2})\cdot  \Lambda^{(\tilde{T})}_{5,j_2,r_{g_2\cdot y_1}} \bigg)\1_{\tau(x_1)=s}\1_{\cE_1}\Bigg].
     \end{align*}
    Since at time $\tilde{T}$, we have $\attn^{(\tilde{T})}_{\ans,1\to\pred,1}-\attn^{(\tilde{T})}_{\ans,1\to\ans,1}\leq \attn^{(\tilde{T})}_{\ans,1\to\pred,2}-\attn^{(\tilde{T})}_{\ans,1\to\ans,1}\leq \frac{1.005\log d}{2B}$, we have $\Lambda^{(\tilde{T})}_{5,j^{\prime}_2,r_{g_2\cdot y_0}} - \Lambda^{(\tilde{T})}_{5,j_2,r_{g_2\cdot y_1}}\ll  -V_{j'_2,  r_{g_2\cdot y_0}}(y_1)$. Therefore,
    \begin{align*}
   \Big[-\nabla_{\Q^{(\tilde{T})}_{4,4}}\Loss_{5}^{2,2}\Big]_{s,s}+\Big[\nabla_{\Q^{(\tilde{T})}_{4,3}}\Loss_{5}^{2,2}\Big]_{s,s}\geq \Omega\Big(\frac{B}{d}\Big)\cdot \E\bigg[1-\logit^{(\tilde{T})}_{5,j_2}\big|\tau(x_1)=s\bigg],
    \end{align*}
    which implies that  
    $[\Qb_{4,4}]_{s,s}$ will grow faster than $[\Qb_{4,3}]_{s,s}$, and thus the attention gap cannot be further increased. 
\end{proof}

\subsubsection{At the End of Stage 1.2.1}
\begin{lemma}\label{lem-s21-end-non}
   For all iterations $t\leq T_{1,2,1,s}=O\Big(\frac{n_y d}{\eta\log d}\Big)+O\bigg(\frac{ d^{\frac{(0.1+c_1/2)B}{\log d}}}{\eta\log d }\bigg)$, we have \Cref{induction-s21-non} holds, and at time  $T_{1,2,1,s}$, we have 
   
    \begin{enumerate}[(a)]
        \item $[\Qb^{(T_{1,2,1,s})}_{4,3}]_{s,s},[\Qb^{(T_{1,2,1,s})}_{4,4}]_{s,s}\geq \Omega(1)$;
        \item other $|[\Qb^{(T_{2,1,s})}_{4,p}]_{s,s'}|$ for $p\in\{3,4\}$,  $s'\in\tau(\X)\not=s$ are at most  $\tilde{O}(\frac{1}{d})$.
    \end{enumerate}
\end{lemma}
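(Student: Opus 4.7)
The plan is to treat this lemma as the capstone that closes \Cref{induction-s21-non} and extracts the end-of-stage statements from the gradient lower bounds already proved. I would first check the four clauses of the induction for all $t \leq T_{1,2,1,s}$. Clauses (c) and (d) are direct outputs of \Cref{lem-s21-attention-decrease-non} and \Cref{lem-s21-attention-gap-non}, which themselves only use \Cref{induction-s21-non} at earlier iterations. Clause (b) is obtained by telescoping the off-diagonal bound from \Cref{lem-s21-grad-2-non}: since every per-step update to $[\Qb_{4,p}]_{s,s'}$ is at most $O(1/d)$ times the corresponding diagonal update, summing across the $\poly(d)/\eta$ iterations and using the monotone cap on the diagonals keeps $|[\Qb^{(t)}_{4,p}]_{s,s'}| \leq \tilde O(1/d)$. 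Clause (a) monotonicity is immediate from the sign in \Cref{lem-s21-gd1-non}, while the $O(1)$ upper cap follows from the stopping definition $\E[\ate \mid \tau(x_1)=s] \leq 0.4$ (any further growth of the sum beyond a fixed constant would push the attention mass on $\{\pred,2\}, \{\ans,1\}$ past $0.6$).

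The main work is the time estimate plus the $\Omega(1)$ lower bounds on each diagonal. I would split the trajectory into two sub-phases matching the dichotomy inside \Cref{lem-s21-gd1-non}. In Sub-phase I, while $\E[\attn^{(t)}_{\ans,1\to\pred,2}-\attn^{(t)}_{\ans,1\to\ans,0} \mid \tau(x_1)=s] \leq \varrho/B$, the activation $\Lambda_{5,j_2,r_{g_2\cdot y_1}}$ sits in the smoothed regime and the first branch of \Cref{lem-s21-gd1-non} gives
\begin{equation*}
  \bigl[-\nabla_{\Qb_{4,3}}\Loss^{2,2}_5\bigr]_{s,s} + \bigl[-\nabla_{\Qb_{4,4}}\Loss^{2,2}_5\bigr]_{s,s} \;\geq\; \Omega\!\Big(\tfrac{B}{d\,n_y}\Big),
\end{equation*}
so the sum $[\Qb^{(t)}_{4,3}]_{s,s}+[\Qb^{(t)}_{4,4}]_{s,s}$ grows at least at rate $\Omega(\eta B/(d n_y))$ per step. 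Reaching the crossover requires at most $O(n_y d/(\eta \log d))$ iterations, which is exactly the first summand of $T_{1,2,1,s}$. In Sub-phase II I would use the second branch of \Cref{lem-s21-gd1-non}, which gives a rate $\Omega(\eta B/d)\cdot \E[1-\logit^{(t)}_{5,j_2}\mid \tau(x_1)=s,\cE_1]$. Plugging the explicit formula for $\Lambda_{5,j_2,r_{g_2\cdot y_1}}$ from \Cref{lem-s21-lambda-1-sym} together with the gap bound $\attn^{(t)}_{\ans,1\to\pred,2}-\attn^{(t)}_{\ans,1\to\ans,1}\leq c_1$ from \Cref{lem-s21-attention-gap-non}, whenever the diagonal sum is at level $u$ we get $\Lambda_{5,j_2,r_{g_2\cdot y_1}} \leq 2u + 2c_1 B/n_y + o(1)$, hence $1-\logit^{(t)}_{5,j_2} \gtrsim d^{-\Lambda/\log d}/(1+d^{-\Lambda/\log d})$. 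Integrating the resulting discrete ODE up to $u=\Omega(1)$ yields the second term $O(d^{(0.1+c_1/2)B/\log d}/(\eta\log d))$.

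For the individual $\Omega(1)$ lower bounds at $T_{1,2,1,s}$, I would observe that \Cref{lem-s21-attention-gap-non} forces $|[\Qb^{(t)}_{4,3}]_{s,s} - [\Qb^{(t)}_{4,4}]_{s,s}| \leq O(1)$ throughout the stage (since any separation exceeding $\Theta(\log d/B) = \Theta(1)$ in $\Qb$-entries would push the attention gap above $c_1$, contradicting the induction). Combined with the sum being $\Omega(1)$ at $T_{1,2,1,s}$, this yields $[\Qb^{(T_{1,2,1,s})}_{4,p}]_{s,s} = \Omega(1)$ for both $p\in\{3,4\}$, which is statement (a); statement (b) has already been verified when we checked clause (b) of the induction.

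The principal obstacle is Sub-phase II: the slowdown happens precisely when the logit $1-\logit^{(t)}_{5,j_2}$ starts shrinking polynomially in $d$, and a naive argument would give a time estimate that blows up to $\poly(d)$ rather than the claimed $d^{(0.1+c_1/2)B/\log d}/(\eta\log d)$. Getting the sharp exponent requires tying together three quantitative pieces simultaneously, namely the linear-regime expression for $\Lambda_{5,j_2,r_{g_2\cdot y_1}}$ from \Cref{lem-s21-lambda-1-sym}, the diagonal-balance constraint from \Cref{lem-s21-attention-gap-non}, and the secondary-mode bound $\attn^{(t)}_{\ans,1\to\pred,1}-\attn^{(t)}_{\ans,1\to\pred,2}\geq -O(\log\log d/\log d)$ from \Cref{lem-s21-attention-decrease-non}, so that the growth of $[\Qb_{4,3}]_{s,s}+[\Qb_{4,4}]_{s,s}$ controls $1-\logit$ with the correct effective exponent $(0.1+c_1/2)B/\log d$ rather than the trivial $B/\log d$.
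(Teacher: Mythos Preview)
Your outline for closing the induction (clauses (a)--(d)) and for the two-phase time estimate is broadly in line with the paper. The genuine gap is in your derivation of the individual $\Omega(1)$ lower bounds in part~(a). You claim that \Cref{lem-s21-attention-gap-non} forces $|[\Qb^{(t)}_{4,3}]_{s,s} - [\Qb^{(t)}_{4,4}]_{s,s}| \leq O(1)$, but that lemma is \emph{one-sided}: it only gives $\attn^{(t)}_{\ans,1\to\pred,2}-\attn^{(t)}_{\ans,1\to\ans,1}\leq c_1$, i.e., $[\Qb_{4,3}]_{s,s}$ cannot be much larger than $[\Qb_{4,4}]_{s,s}$. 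Nothing in the induction or in \Cref{lem-s21-attention-decrease-non} (which compares $\pred,2$ to $\pred,1$, not to $\ans,1$) rules out $[\Qb_{4,4}]_{s,s}\gg [\Qb_{4,3}]_{s,s}$. So from ``sum $=\Omega(1)$'' plus the one-sided gap you get $[\Qb_{4,4}]_{s,s}\geq\Omega(1)$ but \emph{not} $[\Qb_{4,3}]_{s,s}\geq\Omega(1)$.

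The paper closes this gap with a separate contradiction argument that you are missing. It shows $[\Qb_{4,4}]_{s,s}\geq\Omega(1)$ directly from attention arithmetic ($\ate\leq 0.4$ and gap $\leq c_1$ force $\attn_{\ans,1\to\ans,1}\geq 0.3-c_1/2>0.2$). For $[\Qb_{4,3}]_{s,s}$ it supposes instead that $\attn_{\ans,1\to\ans,1}$ has nearly absorbed all of $1-\ate$ (so $\attn_{\ans,1\to\pred,2}$ is tiny), and then performs a per-event gradient-sign analysis on $\cE_1,\cE_2,\cE_3$ in both regimes $\attn_{\ans,1\to\pred,2}-\attn_{\ans,1\to\ans,0}\gtrless 2\varrho$, showing that at any such time $\big[-\nabla_{\Qb_{4,3}}\Loss\big]_{s,s}$ strictly dominates $\big[-\nabla_{\Qb_{4,4}}\Loss\big]_{s,s}$, so $\attn_{\ans,1\to\ans,1}$ can never exceed $0.4-2\tilde c$ with $\tilde c=\log d/(8B)$. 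This dynamical barrier is what forces $\attn_{\ans,1\to\pred,2}\geq\tilde c$ and hence $[\Qb_{4,3}]_{s,s}\geq\Omega(1)$; your symmetric-gap shortcut does not reproduce it.
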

\begin{proof}
    The existence of $T_{1,2,1,s}$ can be directly obtained by using \Cref{lem-s21-gd1-non,lem-s21-grad-2-non,lem-s21-attention-decrease-non,lem-s21-attention-gap-non}. Furthermore, $[\Qb^{(T_{1,2,1,s})}_{4,4}]_{s,s}\geq \Omega(1)$ can be guaranteed since \Cref{lem-s21-attention-gap-non} implies that $\attn^{(T_{1,2,1,s})}_{\ans,1\to\ans,1}\geq 0.3-\frac{c_1}{2}\pm\tilde{O}(1/d)>0.2$, which means that $[\Qb^{(T_{1,2,1,s})}_{4,4}]_{s,s}$ should at least grow to a constant level compared to $|[\Qb^{(T_{1,2,1,s})}_{4,p}]_{s,s'}|=\tilde{O}(1/d)$ with $p\in\{3, 4\}$.

    We will handle $[\Qb^{(T_{1,2,1,s})}_{4,3}]_{s,s}$ by means of a proof by contradiction. Suppose that
 $\attn^{(T_{1,2,1,s})}_{\ans,1\to\ans,1}\geq 0.4-\tilde{c}$, where $\tilde{c}=\frac{\log d}{8 B}$ is a sufficiently small constant. Then denote $\tilde{T}$ the first  time that 
$\E\bigg[\attn^{(t)}_{\ans,1\to\ans,1}\big|\tau(x_1)=s\bigg]\geq 0.4-2\tilde{c}$. Notice that $\left|\left[\mathbf{Q}_{4, p}^{(t)}\right]_{s, s^{\prime}}\right| \leq O\left(\frac{\left[\mathbf{Q}_{4, p}^{(t)}\right]_{s, s}}{d}\right)$ for $p\in \{3,4\}$, thus for any sample $\Zb^{2,1}$ satisfying $\tau(x_1)=s$, at time $\tilde{T}$, we have  
    \begin{align*}    \attn^{(\tilde{T})}_{\ans,1\to\ans,1}\geq 0.4-2\tilde{c}\pm \tilde{O}(1/d).
    \end{align*}
\begin{itemize}
    \item If $\attn^{(\tilde{T})}_{\ans,1\to\pred,2}-\attn^{(\tilde{T})}_{\ans,1\to\ans,0}\geq 2\varrho$, then 
    \begin{itemize}
    \item for $\Zb^{2,1}\in \cE_1$, $\Lambda^{(\tilde{T})}_{5,j_2,r_{g_2\cdot y_1}}$  has already been activated to the linear regime. Furthermore, $\attn^{(\tilde{T})}_{\ans,1\to\pred,2}-\attn^{(\tilde{T})}_{\ans,1\to\ans,0}\leq 0.2+2\tilde{c}-0.2\leq 2\tilde{c}$, which implies $1-\logit^{(1)}_{5,j_2}=1-o(1)$. Thus, by \Cref{lem-grad-decompositions-sym}, we have $$\E\bigg[ \Big(\cN^{(\tilde{T})}_{s,3,2,\rom 1}+\cN^{(\tilde{T})}_{s,3,2,\rom 2}+\cN^{(\tilde{T})}_{s,3,2,\rom 3}\Big)\1_{\cE_1}\bigg]\geq \Omega\Big(\frac{B}{d}\Big),$$ while $$\E\bigg[ \Big(\cN^{(\tilde{T})}_{s,4,2,\rom 1}+\cN^{(\tilde{T})}_{s,4,2,\rom 2}+\cN^{(\tilde{T})}_{s,4,2,\rom 3}\Big)\1_{\cE_1}\bigg]\leq- \Omega\Big(\frac{B}{d}\Big).$$
    \item for $\Zb^{2,1}\in \cE_2$, $\Lambda^{(\tilde{T})}_{5,j'_2,r_{g_2\cdot y_0}}=(\attn^{(\tilde{T})}_{\ans,1\to\pred,2}+\attn^{(\tilde{T})}_{\ans,1\to\pred,1}-\attn^{(\tilde{T})}_{\ans,1\to\ans,0})2B\leq (0.4+\frac{4}{3}\tilde{c}-0.4+2\tilde{c})2B=\frac{5\log d}{6}$. Thus $\logit^{(1)}_{5,j_2}=O(\frac{1}{d^{1/6}})$. Hence by \Cref{lem-grad-decompositions-sym}, we have 
    $$\E\bigg[ \Big(\cN^{(\tilde{T})}_{s,3,2,\rom 1}+\cN^{(\tilde{T})}_{s,3,2,\rom 2}+\cN^{(\tilde{T})}_{s,3,2,\rom 3}\Big)\1_{\cE_1}\bigg]\geq -O\Big(\frac{B}{d^{7/6}n_y}\Big),$$ while
   $$\E\bigg[ \Big(\cN^{(\tilde{T})}_{s,4,2,\rom 1}+\cN^{(\tilde{T})}_{s,4,2,\rom 2}+\cN^{(\tilde{T})}_{s,4,2,\rom 3}\Big)\1_{\cE_1}\bigg]\leq -\Omega\Big(\frac{B}{n_y d}\Big).$$ 
    \item $\Zb^{2,1}\in \cE_3$,  we can use the following naive bounds for $p\in\{3,4\}$:
     $$|\E\bigg[ \Big(\cN^{(\tilde{T})}_{s,p,2,\rom 1}+\cN^{(\tilde{T})}_{s,p,2,\rom 2}+\cN^{(\tilde{T})}_{s,p,2,\rom 3}\Big)\1_{\cE_1}\bigg]|\leq O\Big(\frac{B}{d n_y}\Big).$$
    \end{itemize}
    Putting them together, combining with the fact that the gradient contributed by $\cE_{4}\cup\cE_5\cup\cE_6$ is negligible, we can conclude that  
    \begin{align*}
        \Big[-\nabla_{\Q^{(\tilde{T})}_{4,3}}\Loss_{5}^{2,2}\Big]_{s,s} \geq \Omega(\frac{B}{d}), \Big[-\nabla_{\Q^{(\tilde{T})}_{4,4}}\Loss_{5}^{2,2}\Big]_{s,s}\leq -\Omega(\frac{B}{d}),
         \end{align*}
         which implies $\attn^{(\tilde{T})}_{\ans,1\to\ans,1}$ cannot further increase above $0.4-2\tilde{c}$. 
         \item If $\attn^{(\tilde{T})}_{\ans,1\to\pred,2}-\attn^{(\tilde{T})}_{\ans,1\to\ans,0}< 2\varrho$, we shift our focus to the comparison between  event $\cE_2$ $\cE_3$,
    \begin{itemize}
    \item for $\Zb^{2,1}\in \cE_1$, we have  
    $$\E\bigg[ \Big(\cN^{(\tilde{T})}_{s,3,2,\rom 1}+\cN^{(\tilde{T})}_{s,3,2,\rom 2}+\cN^{(\tilde{T})}_{s,3,2,\rom 3}\Big)\1_{\cE_1}\bigg]\geq -\tilde{O}(\sigma^q_0),$$ while $$\E\bigg[ \Big(\cN^{(\tilde{T})}_{s,4,2,\rom 1}+\cN^{(\tilde{T})}_{s,4,2,\rom 2}+\cN^{(\tilde{T})}_{s,4,2,\rom 3}\Big)\1_{\cE_1}\bigg]\leq \tilde{O}(\sigma^q_0).$$

    \item for $\Zb^{2,1}\in \cE_2$, similarly as previous case, we have 
    $$\E\bigg[ \Big(\cN^{(\tilde{T})}_{s,3,2,\rom 1}+\cN^{(\tilde{T})}_{s,3,2,\rom 2}+\cN^{(\tilde{T})}_{s,3,2,\rom 3}\Big)\1_{\cE_2}\bigg]\geq -O\Big(\frac{B}{d^{7/6}n_y}\Big),$$ 
    while
   $$\E\bigg[ \Big(\cN^{(\tilde{T})}_{s,4,2,\rom 1}+\cN^{(\tilde{T})}_{s,4,2,\rom 2}+\cN^{(\tilde{T})}_{s,4,2,\rom 3}\Big)\1_{\cE_2}\bigg]\leq O\Big(\frac{B}{d^{7/6}n_y}\Big).$$ 
    \item $\Zb^{2,1}\in \cE_3$, $\Big|\Lambda^{(\tilde{T})}_{5,j'_2,r_{g_1\cdot y_1}}-\Lambda^{(\tilde{T})}_{5,j_2,r_{g_2\cdot y_1}}\Big|\leq 4\rho B= o(1)$, hence, we have 
    $$\E\bigg[ \Big(\cN^{(\tilde{T})}_{s,3,2,\rom 1}+\cN^{(\tilde{T})}_{s,3,2,\rom 2}+\cN^{(\tilde{T})}_{s,3,2,\rom 3}\Big)\1_{\cE_3}\bigg]\geq \Omega\Big(\frac{B}{d n_y}\Big),$$ 
    while
   $$\E\bigg[ \Big(\cN^{(\tilde{T})}_{s,4,2,\rom 1}+\cN^{(\tilde{T})}_{s,4,2,\rom 2}+\cN^{(\tilde{T})}_{s,4,2,\rom 3}\Big)\1_{\cE_3}\bigg]\leq O\Big(\frac{B\log\log d}{d n_y\cdot \log d}\Big).$$ 
    \end{itemize}
    Putting them together, again we can conclude that  
    \begin{align*}
        \Big[-\nabla_{\Q^{(\tilde{T})}_{4,3}}\Loss_{5}^{2,2}\Big]_{s,s} \geq \Omega(\frac{B}{n_y d}), \Big[-\nabla_{\Q^{(\tilde{T})}_{4,4}}\Loss_{5}^{2,2}\Big]_{s,s}\leq O\Big(\frac{B\log\log d}{d n_y\cdot \log d}\Big),
         \end{align*}
         which implies $\attn^{(\tilde{T})}_{\ans,1\to\ans,1}$ cannot further increase above $0.4-2\tilde{c}$. 
\end{itemize}
Consequently, this leads to a contradiction, and $\attn^{(T_{1,2,1,s})}_{\ans,1\to\ans,1}< 0.4-\tilde{c}$, where $\tilde{c}=\frac{\log d}{8 B}$. Then it would follow that $\attn^{(T_{1,2,1,s})}_{\ans,1\to\pred,2}\geq \tilde{c}$, and thus $[\Q^{(T_{1,2,1,s})}_{4,3}]_{s,s}\geq\Omega(1)$. 
\end{proof}

\subsection{Stage 1.2.2: Convergence with Small Wrong Attention}
Recall that $\Loss^{2,2}_{5,s}=-\E\Big[\log p_{F}(\Z_{\ans,2,5}|\Z^{(2,1)})\big|\tau(x_1)=s\Big]$ for $s\in\tau(\X)$. 

\begin{induction}\label{induction-s22-non}
    Given $s\in\tau(\X)$,  let $T_{1,2,2, s}$ denote the first time that $\Loss^{2,2}_{5,s}$ decreases below  $\Theta\Big(e^{(-\frac{1}{2}+3.01c_1)\cdot 2B}\Big)$.
       For all iterations $T_{1,2,1,s}\leq t< T_{1,2,2,s}$, we have the following holds
       \begin{enumerate}[(a)]
          \item $\Big[\Qb^{(t)}_{4,3}\Big]_{s,s}+\Big[\Qb^{(t)}_{4,4}\Big]_{s,s}$ monotonically increases; 
          \item for $p\in\{3,4\}$, for $j\in\tau(\X)\not=s$, $|[\Qb^{(t)}_{4,p}]_{s,j}|\leq O(\frac{[\Qb^{(t)}_{4,p}]_{s,j}}{d})$ ; 
        \item for any sample $\Zb^{2,1}$, we have $\attn^{(t)}_{\ans,1\to \pred,2}-\attn^{(t)}_{\ans,1\to \ans,1}\leq c_1$ for some sufficiently small constant $c_1=\frac{1.005\log d}{B}>0$; 
        \item for any $\Zb^{2,1}$, we have $\attn^{(t)}_{\ans,1\to \ans,1}- \attn^{(t)}_{\ans,1\to \pred,2}\leq \min\Big\{\attn^{(t)}_{\ans,1\to \pred,1}-c_2,0\Big\}$, where $c_2=\frac{\log d}{4B}>0$ is some sufficiently small constant. 
       \end{enumerate}
    \end{induction}

\subsubsection{Attention and Lambda Preliminaries}
\begin{lemma}\label{lem-s22-attn-non}
    If \Cref{induction-s22-non} holds for all iterations $[T_{1,2,1,s},t)$,then for any sample $\Zb^{2,1}$, we have  
         \begin{enumerate}
            \item $\attn^{(t)}_{\ans,1\to \pred, 1}+\attn^{(t)}_{\ans,1\to \ans, 1}\in \Big[\frac{4}{3}c_1 ,0.4\pm \tilde{O}\big(\frac{1}{d}\big) \Big]$; 
            \item   $\attn^{(t)}_{\ans,1\to \pred,2}-\attn^{(t)}_{\ans,1\to \ans,0}\geq \Omega(1)$; 
   \item  $\big|\attn^{(t)}_{\ans,1\to \pred,1}- \attn^{(t)}_{\ans,1\to \ans,0}\big|\leq \tilde{O}\big(\frac{1}{d}\big)$.
            \end{enumerate} 

\end{lemma}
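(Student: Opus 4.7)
}
The plan is to translate each of the three conclusions directly into the softmax formula for the attention scores, using the block-sparsity \Cref{assump-Q-structure} together with the size/sign information supplied by \Cref{induction-s22-non} and by the end-of-stage-1.2.1 guarantee \Cref{lem-s21-end-non}. Because the statement must hold for \emph{every} sample $\Zb^{2,1}$ rather than in expectation, I will work pointwise throughout, conditioning on $\tau(x_1)=s$ and letting $s'=\tau(x_0)$; then the four scores $\attn^{(t)}_{\ans,1\to\pred,1},\attn^{(t)}_{\ans,1\to\pred,2},\attn^{(t)}_{\ans,1\to\ans,0},\attn^{(t)}_{\ans,1\to\ans,1}$ are exactly the four softmax weights of the scores $[\Qb^{(t)}_{4,3}]_{s,s'},[\Qb^{(t)}_{4,3}]_{s,s},[\Qb^{(t)}_{4,4}]_{s,s'},[\Qb^{(t)}_{4,4}]_{s,s}$, respectively, by the reduction already used in \eqref{eq:Q-role}.

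First I would prove statement (3), which is the cheapest: the two off-diagonal scores differ only through $[\Qb^{(t)}_{4,3}]_{s,s'}-[\Qb^{(t)}_{4,4}]_{s,s'}$, and \Cref{induction-s22-non}(b) bounds each off-diagonal by $\tilde O(1/d)$ times the corresponding diagonal. Since the common softmax denominator $D^{(t)}:=e^{[\Qb^{(t)}_{4,3}]_{s,s}}+e^{[\Qb^{(t)}_{4,3}]_{s,s'}}+e^{[\Qb^{(t)}_{4,4}]_{s,s}}+e^{[\Qb^{(t)}_{4,4}]_{s,s'}}$ is $\Theta(1)$ by \Cref{lem-s21-end-non}, the elementary bound $|e^x-e^y|\le e^{\max\{x,y\}}|x-y|$ converts the diagonal/off-diagonal gap directly into $|\attn^{(t)}_{\ans,1\to\pred,1}-\attn^{(t)}_{\ans,1\to\ans,0}|\le \tilde O(1/d)$. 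Statement (2) is then immediate: $[\Qb^{(t)}_{4,3}]_{s,s}=\Omega(1)$ from \Cref{lem-s21-end-non} (and is non-decreasing by \Cref{induction-s22-non}(a)), whereas $[\Qb^{(t)}_{4,4}]_{s,s'}\le \tilde O(1/d)$, so the diagonal/off-diagonal gap is $\Omega(1)$, and dividing by the $\Theta(1)$ denominator $D^{(t)}$ yields $\attn^{(t)}_{\ans,1\to\pred,2}-\attn^{(t)}_{\ans,1\to\ans,0}\ge \Omega(1)$.

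For statement (1), I would use the identity $\attn_{\pred,1}+\attn_{\ans,1}=1-\attn_{\pred,2}-\attn_{\ans,0}$ together with (2) and (3). The upper bound $0.4\pm\tilde O(1/d)$ follows once one shows $\attn^{(t)}_{\ans,1\to\pred,2}+\attn^{(t)}_{\ans,1\to\ans,0}\ge 0.6-\tilde O(1/d)$, which in turn comes from the constraint ``$\attn^{(t)}_{\ans,1\to\pred,2}-\attn^{(t)}_{\ans,1\to\ans,1}\le c_1$'' of \Cref{induction-s22-non}(c) combined with statement (3): it forces $\attn^{(t)}_{\ans,1\to\pred,1}+\attn^{(t)}_{\ans,1\to\ans,1}$ to be at most the level reached at $T_{1,2,1,s}$ (which is exactly $0.4$ by construction of that timestamp) up to the $\tilde O(1/d)$ slack. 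For the lower bound $\tfrac{4}{3}c_1$, I would combine \Cref{induction-s22-non}(c), the nonnegativity $\attn^{(t)}_{\ans,1\to\ans,1}\ge 0$, and (d) to lower bound $\attn^{(t)}_{\ans,1\to\pred,1}+\attn^{(t)}_{\ans,1\to\ans,1}$ directly in terms of $\attn^{(t)}_{\ans,1\to\pred,2}$, which by (2) and the constant mass constraint is bounded away from $1$.

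The main obstacle I expect is not any single step but the bookkeeping required to make the pointwise constants compatible with the definitions of $c_1$ and $c_2$ specified inside \Cref{induction-s22-non}, since the slack $\tilde O(1/d)$ coming from the off-diagonal entries of $\Qb$ must be absorbed into the prescribed constants without degrading them as $t$ grows. To control this I would track $D^{(t)}$ explicitly throughout the interval $[T_{1,2,1,s},t)$ using monotonicity from \Cref{induction-s22-non}(a), and verify that the accumulated off-diagonal growth from \Cref{lem-s21-grad-2-non} (which continues to apply in this stage by the same argument) stays at scale $\tilde O(1/d)$, so that all three claimed inequalities remain valid uniformly in the sample and in $t$.
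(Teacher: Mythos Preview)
Your plan for parts (2) and (3) is fine and essentially what the paper does implicitly: these are direct softmax calculations from the off-diagonal bound in \Cref{induction-s22-non}(b) together with the $\Omega(1)$ diagonal floor from \Cref{lem-s21-end-non}. The upper bound of (1) is also routine once (2) and (3) are in place.

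The genuine gap is the \emph{lower} bound in (1). You propose to derive $a_1+b_0\ge \tfrac{4}{3}c_1$ (writing $a_1,a_2,b_0,b_1$ for the four attention masses) directly from \Cref{induction-s22-non}(c),(d) and part~(2). This does not work: the constraints $a_2-b_1\le c_1$ and $b_1-a_2\le a_1-c_2$ are compatible with $a_1=b_0\approx 0$, because $c_2=\tfrac{\log d}{4B}<c_1=\tfrac{1.005\log d}{2B}$, so one can satisfy both with $a_2-b_1\in[c_2,c_1]$ while sending $a_1,b_0\to 0$. Nothing in parts~(a)--(d) of the induction prevents the irrelevant mass from collapsing below $\tfrac{4}{3}c_1$.

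The missing ingredient is the definition of $T_{1,2,2,s}$ itself: the paper's proof is a contradiction via the loss threshold. It sets $c:=\tfrac{1}{2}(a_1+b_0)$, and then for each of the events $\cE_1,\cE_2,\cE_3$ computes an upper bound on $-\log p_{F}(Z_{\ans,2,5}\mid Z^{2,1})$ by plugging the $\Lambda$ estimates from \Cref{lem-s22-lambda-1-sym}--\Cref{lem-s22-lambda-3-sym} into the softmax. These bounds all come out at most $\Theta\!\big(e^{-2(\tfrac12-3c-c_1)B}\big)$ (in the worst event). Hence if $c\le \tfrac{2}{3}c_1$, the per-sample loss satisfies $\Loss^{2,2}_{5,s}\le \Theta\!\big(e^{(-\tfrac12+3c_1)2B}\big)$, which is below the stopping threshold $e^{(-\tfrac12+3.01c_1)2B}$ and therefore contradicts $t<T_{1,2,2,s}$. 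You need this loss-based argument; the attention constraints alone are not enough.
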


\begin{proof}
    In the following, we focus on the main events $\cE_1$, $\cE_2$, and $\cE_3$, which correspond to cases where some confused wrong predictions occur.  
    We denote 
    \[
    \attn^{(t)}_{\ans,1\to \pred,1}, \quad \attn^{(t)}_{\ans,1\to \ans,1} \;=\; c+\tilde{O}\Big(\tfrac{1}{d}\Big).
    \]
    
    \begin{itemize}
        \item If $\attn^{(t)}_{\ans,1\to \pred,2}\geq \attn^{(t)}_{\ans,1\to \ans,1}$, then
        \begin{itemize}
            \item If $\Zb^{2,1}\in \cE_{1}$, we have
            \begin{align*}
              &  \log p_{F}(\Z_{\ans,2,5}|\Z^{(2,1)})\\
                &\leq \Theta(1)\cdot \max\Bigg\{
                    e^{ \big(\attn^{(t)}_{\ans,1\to \pred,2}-\attn^{(t)}_{\ans,1\to \ans,1}\big)2B
                     - \big(\attn^{(t)}_{\ans,1\to \pred,2}-\attn^{(t)}_{\ans,1\to \ans,0}\big)2B}, \\
                &\qquad\qquad e^{\log d - \big(\attn^{(t)}_{\ans,1\to \pred,2}-\attn^{(t)}_{\ans,1\to \ans,0}\big)2B}
                \Bigg\}\\
                &\leq \Theta\!\left( e^{ 2c_1B - \big(\attn^{(t)}_{\ans,1\to \pred,2}-\attn^{(t)}_{\ans,1\to \ans,0}\big)2B} \right) \\
                &\leq \Theta\!\left( e^{-2\left(\tfrac{1}{2}-2c-c_1\right)B} \right),
            \end{align*}
            where the last inequality follows from the fact that $\attn^{(t)}_{\ans,1\to \pred,2}\geq \tfrac{1}{2}(1-2c)=\tfrac{1}{2}-c$.
    
            \item If $\Zb^{2,1}\in \cE_{2}$, we have
            \begin{align*}
                &\log p_{F}(\Z_{\ans,2,5}|\Z^{(2,1)})\leq \Theta(1)\\
                &\cdot \max\Bigg\{
                    e^{ \big(\attn^{(t)}_{\ans,1\to \pred,2}+\attn^{(t)}_{\ans,1\to \pred,1}-\attn^{(t)}_{\ans,1\to \ans,1}\big)B
                     - \big(\attn^{(t)}_{\ans,1\to \pred,2}-\attn^{(t)}_{\ans,1\to \ans,0}\big)2B}, \\
                &\qquad\qquad e^{\log d - \big(\attn^{(t)}_{\ans,1\to \pred,2}-\attn^{(t)}_{\ans,1\to \ans,0}\big)2B}
                \Bigg\}\\
                &\leq \Theta\!\left( e^{2(c+c_1)B - \big(\attn^{(t)}_{\ans,1\to \pred,2}-\attn^{(t)}_{\ans,1\to \ans,0}\big)2B} \right) \\
                &\leq \Theta\!\left( e^{-2\left(\tfrac{1}{2}-3c-c_1\right)B} \right).
            \end{align*}
    
            \item If $\Zb^{2,1}\in \cE_{3}$, we have
            \begin{align*}
                &\log p_{F}(\Z_{\ans,2,5}|\Z^{(2,1)})\\
                &\leq \Theta(1)\cdot \max\Bigg\{
                    e^{ \big(\attn^{(t)}_{\ans,1\to \pred,1}-\attn^{(t)}_{\ans,1\to \pred,2}\big)2B},
                    e^{\log d - \attn^{(t)}_{\ans,1\to \pred,2}2B}
                \Bigg\} \\
                &\leq \Theta(1)\cdot \max\Big\{ e^{-2(\tfrac{1}{2}-c_1-c)B}, \; e^{-2(\tfrac{1}{2}-2c)B} \Big\}.
            \end{align*}
        \end{itemize}
    
        \item If $\attn^{(t)}_{\ans,1\to \pred,2}\leq \attn^{(t)}_{\ans,1\to \ans,1}$, then
        \begin{itemize}
            \item If $\Zb^{2,1}\in \cE_{1}$, we have
            \begin{align*}
                \log p_{F}(\Z_{\ans,2,5}|\Z^{(2,1)})
                &\leq \Theta(1)\cdot e^{\log d - \big(\attn^{(t)}_{\ans,1\to \pred,2}-\attn^{(t)}_{\ans,1\to \ans,0}\big)2B} \\
                &\leq \Theta\!\left( e^{2c_1 B - \big(\attn^{(t)}_{\ans,1\to \pred,2}-\attn^{(t)}_{\ans,1\to \ans,0}\big)2B} \right) \\
                &\leq \Theta\!\left( e^{-2\left(\tfrac{1}{2}-\tfrac{5}{2}c-\tfrac{c_2}{2}-c_1\right)B} \right),
            \end{align*}
            where the last inequality uses the fact that 
            $\attn^{(t)}_{\ans,1\to \ans,1}-\attn^{(t)}_{\ans,1\to \pred,2}\leq c-c_2$, implying 
            $\attn^{(t)}_{\ans,1\to \pred,2}\geq \tfrac{1}{2}-\tfrac{3}{2}c+\tfrac{c_2}{2}$.
    
            \item If $\Zb^{2,1}\in \cE_{2}$, we have
            \begin{align*}
                &\log p_{F}(\Z_{\ans,2,5}|\Z^{(2,1)})\leq \Theta(1)\cdot \\
                &\max\Bigg\{
                    e^{ \big(\attn^{(t)}_{\ans,1\to \pred,2}+\attn^{(t)}_{\ans,1\to \pred,1}-\attn^{(t)}_{\ans,1\to \ans,1}\big)2B
                     - \big(\attn^{(t)}_{\ans,1\to \pred,2}-\attn^{(t)}_{\ans,1\to \ans,0}\big)2B}, \\
                &\qquad\qquad e^{\log d - \big(\attn^{(t)}_{\ans,1\to \pred,2}-\attn^{(t)}_{\ans,1\to \ans,0}\big)2B}
                \Bigg\}\\
                &\leq \Theta\!\left( e^{\max\{c,c_1\} 2B - \big(\attn^{(t)}_{\ans,1\to \pred,2}-\attn^{(t)}_{\ans,1\to \ans,0}\big)2B} \right) \\
                &\leq \Theta\!\left( e^{-2\left(\tfrac{1}{2}-2c-\max\{c,c_1\}\right)B} \right).
            \end{align*}
    
            \item If $\Zb^{2,1}\in \cE_{3}$, we have
            \begin{align*}
                &\log p_{F}(\Z_{\ans,2,5}|\Z^{(2,1)})\\
                &\leq \Theta(1)\cdot \max\Bigg\{
                    e^{ \big(\attn^{(t)}_{\ans,1\to \pred,1}-\attn^{(t)}_{\ans,1\to \pred,2}\big)2B},
                    e^{\log d - \attn^{(t)}_{\ans,1\to \pred,2}2B}
                \Bigg\}\\
                &\leq \Theta(1)\cdot \max\Big\{ e^{-2(\tfrac{1}{2}-c_1-c)B}, \; e^{-2(\tfrac{1}{2}-2c)B} \Big\}.
            \end{align*}
        \end{itemize}
    \end{itemize}
    
    Putting all cases together, if $c\leq \tfrac{2}{3}c_1$, we must have
    \[
    \Loss_{5,s}^{2,2}\;\leq\; \Theta\!\left(e^{-2(\tfrac{1}{2} + 3\cdot\tfrac{2c_1}{3} + c_1)B}\right)
    = \Theta\!\left(e^{(-\tfrac{1}{2} + 3c_1)2B}\right),
    \]
    which contradicts the definition of $T_{1,2,2,s}$.  
    Therefore, it must hold that $c \geq \tfrac{2}{3}c_1$ for all $t \leq T_{1,2,2,s}$.
    
\end{proof}

\begin{lemma}\label{lem-s22-lambda-1-sym}
    If \Cref{induction-s22-non} holds for all iterations $[T_{1,2,1,s},t)$, then given $\Zb^{2,1}\in \cE_{1}$, 
\begin{enumerate}
    \item for the prediction $j_2$, 
    we have 
    \begin{align*}
       & \Lambda^{(t)}_{5,j_2,r_{g_2\cdot y_1}}= \Big(\attn^{(t)}_{\ans,1\to \pred,2}-\attn^{(t)}_{\ans,1\to \ans,0}\Big) \cdot 2B + O\Big(\frac{B}{n_y}\Big)+ O(\delta).
    \end{align*}
    \item for the prediction $j'_2=\tau\big(g_2(y_0)\big)$, 
    we have 
\begin{align*}
    &\Lambda^{(t)}_{5,j'_2,r_{g_2\cdot y_0}}
     = \Big(\attn^{(t)}_{\ans,1\to \pred,2}-\attn^{(t)}_{\ans,1\to \ans,1}\Big) \cdot 2B +\tilde{O}\Big(\frac{B}{d\cdot n_y}\Big)+ O(\delta).
\end{align*}
\item for the prediction $\tau(g_1(y_0))$, 
$r_{g_1\cdot y_0}$ cannot be activated.

\item for the prediction $\tau(g_1(y_1))$, 
we have
\begin{align*}
    \Lambda^{(t)}_{5,\tau(g_1(y_1)),r_{g_1\cdot y_1}} =  \Big(\attn^{(t)}_{\ans,1\to \ans,1}-\attn^{(t)}_{\ans,1\to \pred,2}\Big) \cdot \frac{2B}{n_y}+\tilde{O}\Big(\frac{B}{d}\Big)+O(\delta).
\end{align*}

\item for other $j\in\tau(\cY)$, if there are $j$  and $y$, s.t.,  $g_1, g_2\in \fiber_{j,y}$ (notice that $y\neq y_0, y_1$ for $\cE_{1}$),  then for such a $j$, $r\in\hat{\fA}_{j}\setminus \{r_{g_2\cdot y}\}$ cannot be activated, moreover, we have 
\begin{align*}
    &\Lambda^{(t)}_{5,j,r_{g_2\cdot y}}
     = \Big(\attn^{(t)}_{\ans,1\to \pred,2}-\attn^{(t)}_{\ans,1\to \ans,1}\Big) \cdot 2B +\tilde{O}\Big(\frac{B}{d\cdot n_y}\Big)+ O(\delta);
\end{align*}
else,  none of $r\in\hat{\fA}_{j}$ can be activated.
\end{enumerate}
Notice that for $\cE_{1}$, the neurons mentioned in 1-4 should be different neurons, while the predictions in 1 and 3, or 2 and 4 can be the same in some cases, e.g., $g_1(y_1)=g_2(y_0)$. In all cases, except for the neurons mentioned above, i.e., $\cup_{\ell,\ell'\in [2]} \{r_{g_{\ell}\cdot y_{\ell'-1}}\}$ (which may not be activated), all other neurons $r\in  \cup_{\ell,\ell'\in [2]} \Big(\hat{\fA}_{\tau(g_{\ell}\cdot y_{\ell'-1})}\setminus  \{r_{g_{\ell}\cdot y_{\ell'-1}}\}\Big)$ cannot be activated.
\end{lemma}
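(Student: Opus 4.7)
The plan is to mirror the structure of the Stage 1.2.1 analogue (Lemma~\ref{lem-s21-lambda-1-sym}) but under the strictly tighter attention conditions supplied by Induction~\ref{induction-s22-non} and Lemma~\ref{lem-s22-attn-non}. Concretely, I would expand each $\Lambda^{(t)}_{5,j,r}$ via Lemma~\ref{lem-lambda-char-sym}, which on the input $\Zb^{2,1}$ reads
\[
\Lambda^{(t)}_{5,j,r}
=\sum_{\ell=1}^{2}\attn^{(t)}_{\ans,1\to\pred,\ell}\,V_{j,r}(g_\ell)
+\sum_{\ell=1}^{2}\attn^{(t)}_{\ans,1\to\ans,\ell-1}\,V_{j,r}(y_{\ell-1})
\pm\tilde O(\sigma_0),
\]
and then substitute feature values using Lemma~\ref{lem-prop-psi-sym}: at the activated neuron $r_{j,y}$ we have $V_{j,r}(g)\approx 2B$ for $g\in\fiber_{j,y}$, $V_{j,r}(y)\approx 2B/n_y$ (via the relation $|C_\alpha V_{j,r}(y)-V_{j,r}(g)|\le O(\delta)$ with $C_\alpha=\Theta(n_y)$), while $V_{j,r}(g')\approx -2B/n_y$ for $g'\notin\fiber_{j,y}$ and $V_{j,r}(y')\approx -2B$ for $y'\neq y$. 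Finally, Lemma~\ref{lem-non-activated-neuron-sym} together with \eqref{attn-init-prop-sym-4} discards any $r\notin\hat{\fA}_j$.

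\paragraph{Case-by-case computation.} For part~1, at $j=j_2$ and neuron $r_{g_2\cdot y_1}$, the event $\cE_1$ gives $g_1\notin\fiber_{j_2,y_1}$ and $y_0\neq y_1$, so the substitution yields
\[
\Lambda^{(t)}_{5,j_2,r_{g_2\cdot y_1}}
=2B\bigl(\attn^{(t)}_{\ans,1\to\pred,2}-\attn^{(t)}_{\ans,1\to\ans,0}\bigr)
+\tfrac{2B}{n_y}\bigl(\attn^{(t)}_{\ans,1\to\ans,1}-\attn^{(t)}_{\ans,1\to\pred,1}\bigr)+O(\delta),
\]
and the second summand is absorbed into the claimed $O(B/n_y)$ error because both attentions in it are $O(1)$. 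Parts~2,~4 and the inner assertion of part~5 are obtained by the identical substitution template at the neurons $r_{g_2\cdot y_0}$, $r_{g_1\cdot y_1}$, and $r_{g_2\cdot y}$ respectively; the induction bound $|\attn^{(t)}_{\ans,1\to\pred,1}-\attn^{(t)}_{\ans,1\to\ans,0}|\le\tilde O(1/d)$ from Lemma~\ref{lem-s22-attn-non}(3) collapses the cross terms into $\tilde O(B/(dn_y))$ or $\tilde O(B/d)$ errors as stated. The deactivation assertions in parts~1--5 (for $r\in\hat{\fA}_j\setminus\{r_{g_2\cdot y}\}$ etc.) follow from the same expansion: two of the four $V$-terms become $\approx -2B$ while the other two are at most $+2B$, and the bounds on attention weights in Induction~\ref{induction-s22-non} force $\Lambda\le-\Omega(B)\ll-\varrho$.

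\paragraph{Main obstacle: part~3 (deactivation of $r_{g_1\cdot y_0}$).} This is the one qualitatively new statement compared with Stage 1.2.1, and I expect it to be the main sticking point. Under $\cE_1$ we have $g_1\in\fiber_{\tau(g_1(y_0)),y_0}$ and $g_2\notin\fiber_{\tau(g_1(y_0)),y_0}$, so the same template gives
\[
\Lambda^{(t)}_{5,\tau(g_1(y_0)),r_{g_1\cdot y_0}}
=2B\bigl(\attn^{(t)}_{\ans,1\to\pred,1}-\attn^{(t)}_{\ans,1\to\ans,1}\bigr)
+\tfrac{2B}{n_y}\bigl(\attn^{(t)}_{\ans,1\to\ans,0}-\attn^{(t)}_{\ans,1\to\pred,2}\bigr)+O(\delta).
\]
The challenge is that Induction~\ref{induction-s22-non}(d) only guarantees $\attn^{(t)}_{\ans,1\to\pred,1}-\attn^{(t)}_{\ans,1\to\ans,1}\ge \max\{c_2-\attn^{(t)}_{\ans,1\to\pred,2},0\}$, so the first summand can be slightly positive and cannot by itself drive $\Lambda$ below $-\varrho$. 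The key is to invoke Lemma~\ref{lem-s22-attn-non}(2), which under Stage 1.2.2 forces $\attn^{(t)}_{\ans,1\to\pred,2}-\attn^{(t)}_{\ans,1\to\ans,0}\ge\Omega(1)$, so the second summand contributes $-\Omega(B/n_y)$. Since $B/n_y=\Theta(\log d \cdot \log\log\log d/\log\log d)\gg\varrho=\Theta(1/\polylog d)$ and Induction~\ref{induction-s22-non}(c) simultaneously caps $\attn^{(t)}_{\ans,1\to\pred,1}-\attn^{(t)}_{\ans,1\to\ans,1}$ by a constant (through the sum bound in Lemma~\ref{lem-s22-attn-non}(1)), the $-\Omega(B/n_y)$ term dominates and yields $\Lambda\ll-\varrho$, proving non-activation. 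The outer assertion of part~5 (no activation when no $y$ with $g_1,g_2\in\fiber_{j,y}$) follows by the same dominance argument applied to whichever combination contributes the $-2B\,\attn^{(t)}_{\ans,1\to\pred,2}$ term.
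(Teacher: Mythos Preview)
Your overall template is correct and matches the paper's implicit argument: expand via Lemma~\ref{lem-lambda-char-sym}, substitute the $V$-values from Lemma~\ref{lem-prop-psi-sym}, and use Lemma~\ref{lem-s22-attn-non}(3) to collapse cross terms. Parts~1, 2, 4, and 5 go through exactly as you describe.

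However, your analysis of Part~3 contains a genuine error. You claim that Induction~\ref{induction-s22-non}(d) gives $\attn^{(t)}_{\ans,1\to\pred,1}-\attn^{(t)}_{\ans,1\to\ans,1}\ge \max\{c_2-\attn^{(t)}_{\ans,1\to\pred,2},0\}$, but (d) bounds $\attn^{(t)}_{\ans,1\to\ans,1}-\attn^{(t)}_{\ans,1\to\pred,2}$, not $\attn^{(t)}_{\ans,1\to\pred,1}-\attn^{(t)}_{\ans,1\to\ans,1}$; the inequality you wrote does not follow. More seriously, your proposed fix---that the $-\Omega(B/n_y)$ second summand dominates a first summand you believe could be slightly positive---cannot work: if the first summand were genuinely $\Theta(B)$ and nonnegative, a $\Theta(B/n_y)$ correction with $n_y\to\infty$ would never drive $\Lambda$ below $-\varrho$.

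The correct argument is simpler. By Lemma~\ref{lem-s22-attn-non}(1) and (3), the wrong-attention mass satisfies $\attn^{(t)}_{\ans,1\to\pred,1}+\attn^{(t)}_{\ans,1\to\ans,0}\le 0.4$ with $|\attn^{(t)}_{\ans,1\to\pred,1}-\attn^{(t)}_{\ans,1\to\ans,0}|\le\tilde O(1/d)$, so $\attn^{(t)}_{\ans,1\to\pred,1}\le 0.2+\tilde O(1/d)$. Meanwhile, Induction~\ref{induction-s22-non}(c) gives $\attn^{(t)}_{\ans,1\to\pred,2}-\attn^{(t)}_{\ans,1\to\ans,1}\le c_1$, and since $\attn^{(t)}_{\ans,1\to\pred,2}+\attn^{(t)}_{\ans,1\to\ans,1}\ge 0.6$, we get $\attn^{(t)}_{\ans,1\to\ans,1}\ge 0.3-c_1/2$. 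Thus $\attn^{(t)}_{\ans,1\to\pred,1}-\attn^{(t)}_{\ans,1\to\ans,1}\le -0.1+c_1/2+\tilde O(1/d)=-\Omega(1)$, so the first summand is already $\le -\Omega(B)$, immediately forcing $\Lambda\ll -\varrho$. This is precisely the qualitative difference from Stage~1.2.1: after $T_{1,2,1,s}$ the attention has concentrated enough that $\attn^{(t)}_{\ans,1\to\pred,1}$ is strictly smaller than $\attn^{(t)}_{\ans,1\to\ans,1}$ by a constant margin.
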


\begin{lemma}\label{lem-s22-lambda-2-sym}
    If \Cref{induction-s22-non} holds for all iterations $t\in [T_{1,2,1,s}, T_{1,2,2,s})$, then given $\Zb^{2,1}\in \cE_{2}$, we have
\begin{enumerate}
    \item for the prediction $j_2$, $r\in\hat{\fA}_{j_2}\setminus \{r_{g_2\cdot y_1}\}$  cannot be activated, moreover, we have 
    \begin{align*}
        & \Lambda^{(t)}_{5,j_2,r_{g_2\cdot y_1}}= \Big(\attn^{(t)}_{\ans,1\to \pred,2}-\attn^{(t)}_{\ans,1\to \ans,0}\Big) \cdot 2B +\tilde{O} \Big(\frac{B}{n_y}\Big)+ O(\delta).
     \end{align*}
    \item for the prediction $j'_2=\tau\big(g_2(y_0)\big)$, $r\in\hat{\fA}_{j'_2}\setminus \{r_{g_2\cdot y_0}\}$ (notice that in this case $r_{g_2\cdot y_0}=r_{g_1\cdot y_0}$) cannot be activated, moreover, we have 
\begin{align*}
    \Lambda^{(t)}_{5,j'_2,r_{g_2\cdot y_0}} = \Big(\attn^{(t)}_{\ans,1\to \pred,2}+\attn^{(t)}_{\ans,1\to \pred,1}-\attn^{(t)}_{\ans,1\to \ans,1}\Big) \cdot 2B + O(\delta).
\end{align*}

\item for the prediction $\tau(g_1(y_1))$, $r\in\hat{\fA}_{\tau(g_1(y_1))}\setminus \{r_{g_1\cdot y_1}\}$ cannot be activated, moreover, we have
\begin{align*}
    \Lambda^{(t)}_{5,\tau(g_1(y_1)),r_{g_1\cdot y_1}} =  \Big(\attn^{(t)}_{\ans,1\to \ans,1}-\attn^{(t)}_{\ans,1\to \pred,2}\Big) \cdot \frac{2B}{n_y}+O(\delta).
\end{align*}
\item for other $j\in\tau(\cY)$, if there are  $j$  and $y$, s.t.,  $g_1, g_2\in \fiber_{j,y}$ (notice that $y\neq y_0, y_1$ for $\cE_{2}$),  then for such a $j$, $r\in\hat{\fA}_{j}\setminus \{r_{g_2\cdot y}\}$ cannot be activated, moreover, we have 
\begin{align*}
    &\Lambda^{(t)}_{5,j,r_{g_2\cdot y}}
     = \Big(\attn^{(t)}_{\ans,1\to \pred,2}-\attn^{(t)}_{\ans,1\to \ans,1}\Big) \cdot 2B +\tilde{O}\Big(\frac{B}{d\cdot n_y}\Big)+ O(\delta);
\end{align*}
else,  none of $r\in\hat{\fA}_{j}$ can be activated.
\end{enumerate}
\end{lemma}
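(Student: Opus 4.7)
}
The statement mirrors Lemma~\ref{lem-s21-lambda-2-sym} verbatim in form, so the plan is to reuse that proof's skeleton while substituting the tighter attention control supplied by \Cref{induction-s22-non}. The starting point is the canonical pre-activation decomposition
\[
\Lambda^{(t)}_{5,j,r}=\sum_{\ell'=1}^{2}\attn^{(t)}_{\ans,1\to\pred,\ell'}V_{j,r}(g_{\ell'})+\sum_{\ell'=1}^{2}\attn^{(t)}_{\ans,1\to\ans,\ell'-1}V_{j,r}(y_{\ell'-1})\pm\tilde{O}(\sigma_0),
\]
from \Cref{lem-lambda-char-sym}, combined with the feature shape
\[
V_{j,r_{j,y}}(g)\approx 2B,\;\; V_{j,r_{j,y}}(y)\approx \tfrac{2B}{n_y},\;\; V_{j,r_{j,y}}(g')\approx -\tfrac{2B}{n_y},\;\; V_{j,r_{j,y}}(y')\approx -2B
\]
(for $g\in\fiber_{j,y}$, $g'\notin\fiber_{j,y}$, $y'\neq y$) inherited from \Cref{lem-prop-psi-sym} at the end of Stage~1.1.

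First I would record the combinatorial consequences of $\cE_{2}$: by definition $g_{1},g_{2}\in\fiber_{\tau(g_{2}(y_{0})),y_{0}}$ and $g_{1}(y_{1})\neq g_{2}(y_{1})$, which forces $y_{0}\neq y_{1}$, $r_{g_{1}\cdot y_{0}}=r_{g_{2}\cdot y_{0}}$, and $g_{1}\notin\fiber_{j_{2},y_{1}}$, $g_{2}\notin\fiber_{\tau(g_{1}(y_{1})),y_{1}}$. For each of the four prediction classes listed in the lemma I would then plug the appropriate entries of $V_{j,r}(\cdot)$ into the display above: e.g.\ for $j_{2}$ with $r=r_{g_{2}\cdot y_{1}}$ the contributions are $(-\tfrac{2B}{n_y},+2B,-2B,+\tfrac{2B}{n_y})$, producing $(\attn^{(t)}_{\ans,1\to\pred,2}-\attn^{(t)}_{\ans,1\to\ans,0})\cdot 2B+\tilde O(B/n_y)+O(\delta)$; for $j'_{2}$ with $r=r_{g_{2}\cdot y_{0}}$ they are $(+2B,+2B,+\tfrac{2B}{n_y},-2B)$, giving $(\attn^{(t)}_{\ans,1\to\pred,1}+\attn^{(t)}_{\ans,1\to\pred,2}-\attn^{(t)}_{\ans,1\to\ans,1})\cdot 2B+O(\delta)$; and for $\tau(g_{1}(y_{1}))$ with $r=r_{g_{1}\cdot y_{1}}$ the pair $\attn^{(t)}_{\ans,1\to\pred,1}\,V_{j,r}(g_1)$ and $\attn^{(t)}_{\ans,1\to\ans,0}\,V_{j,r}(y_0)$ cancel up to $\tilde O(1/d)$ thanks to \Cref{lem-s22-attn-non}(3), leaving the stated $\tfrac{2B}{n_y}(\attn^{(t)}_{\ans,1\to\ans,1}-\attn^{(t)}_{\ans,1\to\pred,2})+O(\delta)$. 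Case (4) is handled the same way, splitting on whether a fiber containing both $g_{1}$ and $g_{2}$ over some $y\notin\{y_{0},y_{1}\}$ exists.

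Second, I would prove the ``cannot be activated'' claims for the remaining neurons $r\in\hat\fA_{j}\setminus\{r_{g_{\ell}\cdot y_{\ell'-1}}\}$. For such an $r$, at least one of the features $V_{j,r}(g_{1}),V_{j,r}(g_{2}),V_{j,r}(y_{0}),V_{j,r}(y_{1})$ satisfies the cancellation bound, while the remaining ones contribute at most $\tfrac{2B}{n_y}$. Plugging in the bounds $\attn^{(t)}_{\ans,1\to\pred,2}-\attn^{(t)}_{\ans,1\to\ans,1}\le c_{1}=\tfrac{1.005\log d}{B}$ and $\attn^{(t)}_{\ans,1\to\pred,1}\ge \attn^{(t)}_{\ans,1\to\ans,1}-\min\{\attn^{(t)}_{\ans,1\to\pred,1}-c_{2},0\}$ from \Cref{induction-s22-non}(c)--(d) forces $\Lambda^{(t)}_{5,j,r}\le -\Omega(B/n_y)\ll -\varrho$, which is the desired non-activation.

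The main technical obstacle will be the inactivation claim for case~(4) when a ``shadow'' fiber exists, since in principle both $r_{g_{1}\cdot y}$ and $r_{g_{2}\cdot y}$ could be candidates, and one must use the exact coincidence $r_{g_{1}\cdot y}=r_{g_{2}\cdot y}$ (because $g_{1},g_{2}$ lie in the same left coset of the stabilizer of $y$) to collapse them to a single neuron. A secondary nuisance is that the $\tilde O(B/n_y)$ correction in cases~(2) and~(4) is of the same order as the signal $\attn^{(t)}_{\ans,1\to\pred,2}-\attn^{(t)}_{\ans,1\to\ans,1}$ in the boundary regime $\attn$-gap $\approx c_{1}$; I will handle this by keeping the $\tilde O(B/n_y)$ correction explicit rather than absorbing it into $O(\delta)$, as done implicitly in \Cref{lem-s21-lambda-2-sym}, so that downstream uses in the gradient lemmas of Stage~1.2.2 remain valid.
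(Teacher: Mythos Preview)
Your proposal is correct and follows exactly the approach the paper (implicitly) takes: the paper states this lemma without proof, relying on the fact that it is a direct reuse of the computation behind \Cref{lem-s21-lambda-2-sym}, i.e., plugging the feature values of \Cref{lem-prop-psi-sym} into the decomposition of \Cref{lem-lambda-char-sym} and invoking the attention controls of \Cref{induction-s22-non} and \Cref{lem-s22-attn-non}. Your case-by-case substitution, your use of $|\attn^{(t)}_{\ans,1\to\pred,1}-\attn^{(t)}_{\ans,1\to\ans,0}|\le\tilde O(1/d)$ for the cancellation in item~(3), and your observation that $r_{g_1\cdot y}=r_{g_2\cdot y}$ in item~(4) are precisely the ingredients needed; your remark about keeping the $\tilde O(B/n_y)$ correction explicit in item~(2) is a valid refinement that the paper's statement elides.
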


\begin{lemma}\label{lem-s22-lambda-3-sym}
    If \Cref{induction-s22-non} holds for all iterations $t\in [T_{1,2,1,s}, T_{1,2,2,s})$, then given $\Zb^{2,1}\in \cE_{3}$, we have
\begin{enumerate}
    \item for the prediction $j_2$, $r\in\hat{\fA}_{j_2}\setminus \{r_{g_2\cdot y_1}\}$  cannot be activated, moreover, we have 
    \begin{align*}
        \Lambda^{(t)}_{5,j_2,r_{g_2\cdot y_1}} = \attn^{(t)}_{\ans,1\to \pred,2}\cdot 2B \pm  O(\delta).
    \end{align*}

\item for the prediction $\tau(g_1(y_1))$, $r\in\hat{\fA}_{\tau(g_1(y_1))}\setminus \{r_{g_1\cdot y_1}\}$ cannot be activated, moreover, we have
\begin{align*}
    \Lambda^{(t)}_{5,\tau(g_1(y_1)),r_{g_1\cdot y_1}} = \attn^{(t)}_{\ans,1\to \pred,1}\cdot 2B \pm  O(\delta).
\end{align*}
\item for other  $j=g(y_1)$, where $g_1,g_2\notin\fiber_{j,y_1}$, we have 
\begin{align*}
    &\Lambda^{(t)}_{5,j,r_{g\cdot y_1}}
     = \Big(\attn^{(t)}_{\ans,1\to \ans,1}-\attn^{(t)}_{\ans,1\to \pred,2}\Big) \cdot \frac{2B}{n_y} +\tilde{O}\Big(\frac{B}{d\cdot n_y}\Big)+ O(\delta);
\end{align*}
moreover, if there exists  $y$, s.t.,  $g_1, g_2\in \fiber_{j,y}$ (notice that $y\neq y_1$ for $\cE_{3}$),  we have 
\begin{align*}
    &\Lambda^{(t)}_{5,j,r_{g_2\cdot y}}
     = \Big(\attn^{(t)}_{\ans,1\to \pred,2}-\attn^{(t)}_{\ans,1\to \ans,1}\Big) \cdot 2B +\tilde{O}\Big(\frac{B}{d\cdot n_y}\Big)+ O(\delta);
\end{align*}
besides,  other $r\in\hat{\fA}_{j}$ cannot be activated.
\end{enumerate}
\end{lemma}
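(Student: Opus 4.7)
The plan is to prove Lemma \ref{lem-s22-lambda-3-sym} as a direct case-analysis from the general pre-activation formula (Lemma \ref{lem-lambda-char-sym}), substituting the event-specific relations of $\cE_3 = \{y_0 = y_1,\ g_1(y_1)\neq g_2(y_1)\}$ and then replacing each $V_{j,r}(\cdot)$ by the values dictated by Lemma \ref{lem-prop-psi-sym}. The structure mirrors the analogous Stage~1.2.1 result (Lemma \ref{lem-s21-lambda-3-sym}) together with the companion Stage~1.2.2 Lemmas \ref{lem-s22-lambda-1-sym} and \ref{lem-s22-lambda-2-sym}; the new twist here is the collision $y_0=y_1$, which merges the two answer-clause contributions into a single term proportional to $(\attn_{\ans,1\to\ans,0}+\attn_{\ans,1\to\ans,1})V_{j,r}(y_1)$.

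First, I would start from
\[
\Lambda_{5,j,r}^{(t)} = \sum_{\ell'=1}^{2}\attn_{\ans,1\to\pred,\ell'} V_{j,r}(g_{\ell'}) + \sum_{\ell'=1}^{2}\attn_{\ans,1\to\ans,\ell'-1} V_{j,r}(y_{\ell'-1}) \pm \tilde{O}(\sigma_0),
\]
and use $y_0=y_1$ to collapse the last two terms to $(\attn_{\ans,1\to\ans,0}+\attn_{\ans,1\to\ans,1})V_{j,r}(y_1)$. For item~1, with $r=r_{g_2\cdot y_1}$ the target is $(g_2,y_1)$, so \eqref{attn-init-prop-sym-1} gives $V(g_2)\approx 2B$ and $V(y_1)\approx 2B/(n_y+1)$; since $g_1\notin\fiber_{j_2,y_1}$, the cancellation \eqref{attn-init-prop-sym-3} yields $V(g_1)\approx -V(y_1)$. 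Combining with $\attn_{\ans,1\to\pred,1}\approx\attn_{\ans,1\to\ans,0}$ from Induction~\ref{induction-s22-non} collapses the small-magnitude cross terms, leaving $\attn_{\ans,1\to\pred,2}\cdot 2B$ as the dominant order, with the $B/n_y$-level residuals absorbed in the $\pm O(\delta)$ error (as in the parallel Stage~1.2.1 statement). Item~2 is the symmetric computation with $(g_1,y_1)$ taking the role of the target. Item~3 splits into two sub-cases: when no $y\neq y_1$ satisfies $g_1,g_2\in\fiber_{j,y}$, the neuron $r_{g\cdot y_1}$ (with $g\notin\fiber_{j,y_1}$) has $V(g)\approx -V(y_1)\approx -2B/n_y$, producing the $(\attn_{\ans,1\to\ans,1}-\attn_{\ans,1\to\pred,2})\cdot 2B/n_y$ expression; when such a $y$ does exist, the shared-fiber structure of $(g_1,g_2)$ on $y$ activates $r_{g_2\cdot y}$, and the computation parallels the $j'_2$-case of Lemma \ref{lem-s22-lambda-2-sym}, yielding the $(\attn_{\ans,1\to\pred,2}-\attn_{\ans,1\to\ans,1})\cdot 2B$ expression.

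Next, I would verify the non-activation claims, i.e.\ that all remaining $r\in\hat{\fA}_j$ satisfy $\Lambda_{5,j,r}^{(t)}\ll -\varrho$. For such $r$, the target pair $(g^\star,y^\star)$ of $r$ matches at most one of the contextual tokens; the cancellation \eqref{attn-init-prop-sym-2}--\eqref{attn-init-prop-sym-3} then forces at least one of $V_{j,r}(g_\ell)$ or $V_{j,r}(y_{\ell-1})$ to be $\approx -2B$, and the attention-magnitude bounds from Lemma \ref{lem-s22-attn-non} together with \Cref{induction-s22-non}(c)--(d) (in particular $\attn_{\ans,1\to\pred,2}-\attn_{\ans,1\to\ans,1}\le c_1$) keep the positive contributions $\ll$ the negative one, delivering $\Lambda\le -\Omega(B/n_y)\ll -\varrho$. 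For $r\in\fA_j\setminus\hat{\fA}_j$, the conclusion follows directly from Lemma \ref{lem-non-activated-neuron-sym}.

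The main obstacle will be uniform error bookkeeping: under $\cE_3$ the merged answer term contributes $\Theta(B/n_y)$ residuals that are larger than $\delta$, yet the lemma writes the error as $\pm O(\delta)$. Reconciling this requires interpreting the stated coefficient $\attn_{\ans,1\to\pred,2}\cdot 2B$ as absorbing a $(1+O(1/n_y))$ multiplicative factor (as is done implicitly in Lemmas \ref{lem-s21-lambda-3-sym} and \ref{lem-s22-lambda-1-sym}) and invoking the induction-level identity $\attn_{\ans,1\to\pred,1}\approx\attn_{\ans,1\to\ans,0}$ to cancel the leading $B/n_y$ cross terms. Everything else is mechanical substitution.
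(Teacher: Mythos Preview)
Your approach is correct and matches the paper's (implicit) argument: the paper states this lemma without proof, as it does for all of the companion $\Lambda$-characterization lemmas (\Cref{lem-s21-lambda-1-sym}--\Cref{lem-s21-lambda-6-sym} and \Cref{lem-s22-lambda-1-sym}--\Cref{lem-s22-lambda-6-sym}), relying on the reader to carry out exactly the substitution you describe---start from \Cref{lem-lambda-char-sym}, impose $y_0=y_1$, and plug in the feature magnitudes from \Cref{lem-prop-psi-sym}. Your identification of the $O(\delta)$ versus $\Theta(B/n_y)$ bookkeeping tension is accurate and reflects a genuine notational looseness in the paper: the analogous \Cref{lem-s22-lambda-1-sym} and \Cref{lem-s22-lambda-2-sym} explicitly write the residual as $O(B/n_y)+O(\delta)$, whereas here (and in \Cref{lem-s21-lambda-3-sym}) the paper suppresses the $B/n_y$ term, so your proposed resolution---absorbing it via the cancellation $\attn_{\ans,1\to\pred,1}\approx\attn_{\ans,1\to\ans,0}$ and the gap bound $|\attn_{\ans,1\to\pred,2}-\attn_{\ans,1\to\ans,1}|\le c_1$---is the right way to reconcile the statement with the downstream gradient estimates.
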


\begin{lemma}\label{lem-s22-lambda-4-sym}
    If \Cref{induction-s22-non} holds for all iterations $t\in [T_{1,2,1,s}, T_{1,2,2,s})$, then given $\Zb^{2,1}\in \cE_{4}$, we have
\begin{enumerate}
    \item for the prediction $j_2$, $r\in\hat{\fA}_{j_2}\setminus \{r_{g_2\cdot y_1}\}$  cannot be activated, moreover, we have 
    \begin{align*}
        \Lambda^{(t)}_{5,j_2,r_{g_2\cdot y_1}} = \attn^{(t)}_{\ans,1\to \pred,2} \cdot 2B + O(\delta). 
    \end{align*}

\item for the prediction $\tau(g_2(y_0))$, $r\in\hat{\fA}_{\tau(g_2(y_0))}\setminus \{r_{g_2\cdot y_0}\}$ cannot be activated, moreover, we have
\begin{align*}
    \Lambda^{(t)}_{5,j'_2,r_{g_2\cdot y_0}} = \Big(\attn^{(t)}_{\ans,1\to \pred,2}+\attn^{(t)}_{\ans,1\to \pred,1}-\attn^{(t)}_{\ans,1\to \ans,1}\Big) \cdot 2B + O(\delta).
\end{align*}


\item for other $j\in\tau(\cY)$, if there are  $j$  and $y$, s.t.,  $g_1, g_2\in \fiber_{j,y}$,  then for such a $j$, $r\in\hat{\fA}_{j}\setminus \{r_{g_2\cdot y}\}$ cannot be activated, moreover, we have 
\begin{align*}
    &\Lambda^{(t)}_{5,j,r_{g_2\cdot y}}
     = \Big(\attn^{(t)}_{\ans,1\to \pred,2}-\attn^{(t)}_{\ans,1\to \ans,1}\Big) \cdot 2B +\tilde{O}\Big(\frac{B}{d\cdot n_y}\Big)+ O(\delta);
\end{align*}
else,  none of $r\in\hat{\fA}_{j}$ can be activated.
\end{enumerate}
\end{lemma}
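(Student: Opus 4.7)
}
The plan is to follow the exact template already used for the $\cE_1,\cE_2,\cE_3$ computations in Lemmas \ref{lem-s22-lambda-1-sym}--\ref{lem-s22-lambda-3-sym}, specializing to the defining property of $\cE_4$: $y_0\neq y_1$ but $g_1(y_{\ell-1})=g_2(y_{\ell-1})$ for both $\ell\in[2]$. In particular, on $\cE_4$ we have $g_1,g_2\in \fiber_{j_2,y_1}$ with $j_2=\tau(g_2(y_1))$, and $g_1,g_2\in \fiber_{\tau(g_2(y_0)),y_0}$. I will use the feature-shape guarantees from \Cref{thm:learning-symmetric-actions} (summarized in \Cref{lem-prop-psi-sym}) together with the attention balance $|\attn^{(t)}_{\ans,1\to\pred,1}-\attn^{(t)}_{\ans,1\to\ans,0}|\le \tO(1/d)$ from \Cref{lem-s22-attn-non}, and the expansion of $\Lambda$ from \Cref{lem-lambda-char-sym}.

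First, for part 1 (prediction $j_2$ at the activated neuron $r_{g_2\cdot y_1}$): I expand $\Lambda^{(t)}_{5,j_2,r_{g_2\cdot y_1}}$ using \Cref{lem-lambda-char-sym}, then substitute $V_{j_2,r_{g_2\cdot y_1}}(g_1)\approx V_{j_2,r_{g_2\cdot y_1}}(g_2)\approx 2B$ (both $g_1,g_2$ are in the fiber), $V_{j_2,r_{g_2\cdot y_1}}(y_0)\approx -2B$ and $V_{j_2,r_{g_2\cdot y_1}}(y_1)\approx 2B/n_y$. Using the cancellation $V_{j_2,r_{g_2\cdot y_1}}(g_1)+V_{j_2,r_{g_2\cdot y_1}}(y_0)\le O(\delta)$ (since $g_1\in\fiber_{j_2,y_1}$, $y_0\neq y_1$), together with $\attn_{\to\pred,1}\approx \attn_{\to\ans,0}$, the $(g_1,y_0)$ contributions fuse to give $O(\delta)+\tO(B/d)$; the residual $\attn^{(t)}_{\ans,1\to\pred,2}\cdot 2B$ from $V(g_2)$ survives while $\attn^{(t)}_{\ans,1\to\ans,1}\cdot 2B/n_y$ absorbs into the $O(\delta)$ error term (as in the parallel derivations for $\cE_1$ and $\cE_2$). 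Non-activation of the remaining $r\in\hat{\fA}_{j_2}\setminus\{r_{g_2\cdot y_1}\}$ follows because for those neurons at least one of the dominant $V$ coefficients is $-2B$, so the activation falls below $-\varrho$ for any attention vector consistent with \Cref{induction-s22-non}.

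Second, for part 2 (prediction $j'_2=\tau(g_2(y_0))$ at $r_{g_2\cdot y_0}$, noting $r_{g_1\cdot y_0}=r_{g_2\cdot y_0}$ since both $g_1,g_2$ lie in the same fiber over $y_0$): by symmetry with part 1 after swapping the roles of $y_0$ and $y_1$, I obtain the coefficient $(\attn_{\to\pred,2}+\attn_{\to\pred,1}-\attn_{\to\ans,1})\cdot 2B$; concretely, $V(g_1),V(g_2)\approx 2B$, $V(y_1)\approx -2B$, $V(y_0)\approx 2B/n_y$, and the cancellations combined with $\attn_{\to\pred,1}\approx \attn_{\to\ans,0}$ produce exactly this coefficient modulo $O(\delta)$. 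For part 3, for any other $j\in\tau(\cY)$ I enumerate based on whether there exists $y$ with $g_1,g_2\in\fiber_{j,y}$: if yes (necessarily $y\neq y_0,y_1$ when we are outside the predictions covered in parts 1--2, due to the structure of $\cE_4$), the activated neuron is $r_{g_2\cdot y}$ and the same cancellation chain yields $(\attn_{\to\pred,2}-\attn_{\to\ans,1})\cdot 2B + \tO(B/(dn_y))+O(\delta)$, matching the $\cE_2$ case of \Cref{lem-s22-lambda-2-sym}; if no such $y$ exists, then for every $r\in\hat{\fA}_j$ at least one of $V(g_1),V(g_2)$ is $\approx 2B/n_y$ (a $g'\notin\fiber$ slot) while $V(y)$ for some $y\in\{y_0,y_1\}$ contributes $-2B$, driving $\Lambda$ below $-\varrho$.

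The main obstacle will be the bookkeeping in part 3: unlike $\cE_1$, on $\cE_4$ the two group elements $g_1,g_2$ simultaneously lie in many common fibers $\fiber_{j,y}$ indexed by $y\in\cY$, so care is needed in separating which $j$ admits a common-fiber $y$ and in verifying that, for all $r\in\hat{\fA}_j\setminus\{r_{g_2\cdot y}\}$, the pre-activation is suppressed below $-\varrho$. I anticipate that the non-activation argument will reduce, as in the prior lemmas, to showing that any alternative neuron must rely on a $V(y')\approx -2B$ component with coefficient $\attn_{\to\ans,\ell-1}=\Omega(1)$ (guaranteed by \Cref{lem-s22-attn-non}), so that the $-2B$ contribution dominates the at-most-$O(B/n_y)$ positive contributions. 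Once this case split is handled, the arithmetic in parts 1--3 is a direct calculation using the same cancellation identities that powered the $\cE_1,\cE_2,\cE_3$ versions.
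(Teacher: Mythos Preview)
Your proposal is correct and follows essentially the same approach the paper uses: the paper states this lemma (like the parallel Lemmas~\ref{lem-s22-lambda-1-sym}--\ref{lem-s22-lambda-6-sym}) without an explicit proof, treating it as a direct computation from the $\Lambda$-expansion in \Cref{lem-lambda-char-sym}, the feature-shape identities of \Cref{lem-prop-psi-sym}, and the attention balance $|\attn_{\ans,1\to\pred,1}-\attn_{\ans,1\to\ans,0}|\le\tilde O(1/d)$ from \Cref{lem-s22-attn-non}. Your case analysis for parts~1--3 and the non-activation argument via a dominating $-2B$ component are exactly the intended route; the only minor caveat is that the residual $\attn_{\ans,1\to\ans,1}\cdot 2B/n_y$ term in part~1 is really $O(B/n_y)$ rather than $O(\delta)$, but this matches the paper's own (somewhat loose) error-term conventions across these lemmas.
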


\begin{lemma}\label{lem-s22-lambda-5-sym}
    If \Cref{induction-s22-non} holds for all iterations $t\in [T_{1,2,1,s}, T_{1,2,2,s})$,  then given $\Zb^{2,1}\in \cE_{5}$, we have
\begin{enumerate}
    \item for the prediction $j_2$, $r\in\hat{\fA}_{j_2}\setminus \{r_{g_2\cdot y_1}\}$  cannot be activated, moreover, we have 
    \begin{align*}
        \Lambda^{(t)}_{5,j_2,r_{g_2\cdot y_1}} = \attn^{(t)}_{\ans,1\to \pred,2}\cdot 2B \pm  O(\delta).
    \end{align*}

\item for the prediction $\tau(g_2(y_0))$, $r\in\hat{\fA}_{\tau(g_2(y_0))}\setminus \{r_{g_2\cdot y_0}\}$ cannot be activated, moreover, we have
\begin{align*}
    \Lambda^{(t)}_{5,j'_2,r_{g_2\cdot y_0}} = \Big(\attn^{(t)}_{\ans,1\to \pred,2}-\attn^{(t)}_{\ans,1\to \ans,1}\Big) \cdot 2B + \tilde{O}\Big(\frac{B}{d\cdot n_y}\Big)+ O(\delta).
\end{align*}
\item for the prediction $\tau(g_1(y_0))$, none of $r\in\hat{\fA}_{\tau(g_2(y_0))}$ can be activated.

\item for other $j\in\tau(\cY)$, if there are $j$  and $y$, s.t.,  $g_1, g_2\in \fiber_{j,y}$,  then for such a $j$, $r\in\hat{\fA}_{j}\setminus \{r_{g_2\cdot y}\}$ cannot be activated, moreover, we have 
\begin{align*}
    &\Lambda^{(t)}_{5,j,r_{g_2\cdot y}}
     = \Big(\attn^{(t)}_{\ans,1\to \pred,2}-\attn^{(t)}_{\ans,1\to \ans,1}\Big) \cdot 2B +\tilde{O}\Big(\frac{B}{d\cdot n_y}\Big)+ O(\delta);
\end{align*}
else,  none of $r\in\hat{\fA}_{j}$ can be activated.
\end{enumerate}
\end{lemma}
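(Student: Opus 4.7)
The plan is to follow the template already used in the proofs of \Cref{lem-s22-lambda-1-sym}--\Cref{lem-s22-lambda-4-sym}: apply \Cref{lem-lambda-char-sym} to expand $\Lambda_{5,j,r}^{(t)}$ as an attention-weighted combination of the four feature magnitudes $V_{j,r}(g_1), V_{j,r}(g_2), V_{j,r}(y_0), V_{j,r}(y_1)$; substitute the magnitudes from \Cref{lem-prop-psi-sym} according to whether $g_\ell\in\fiber_{j,y}$ and whether $y_{\ell-1}$ is the target value; and collect terms using the attention balances provided by \Cref{induction-s22-non} and \Cref{lem-s22-attn-non}. What is specific to $\cE_5$ is the membership pattern it enforces: $y_0\neq y_1$; $g_1,g_2\in \fiber_{j_2,y_1}$ (from $g_1(y_1)=g_2(y_1)$); $g_1\notin \fiber_{\tau(g_2(y_0)),y_0}$ (from $g_1(y_0)\neq g_2(y_0)$); and any auxiliary $(j,y)$ with both $g_1,g_2$ in its fiber must have $y\notin\{y_0,y_1\}$.

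Items 1, 2, and 4 are essentially mechanical. In each, \Cref{lem-non-activated-neuron-sym} eliminates every neuron in $\hat{\fA}_j$ except the relevant target $r_{j,y}$, because any other neuron picks up at least two $-2B$-sized values and is driven below $-\varrho$. For the surviving target neuron I plug in $V(g)\approx 2B$ if $g\in \fiber_{j,y}$ and $\approx -2B/n_y$ otherwise, and $V(y)\approx 2B/n_y$ if $y$ is the target value and $\approx -2B$ otherwise. Part~1 then has $V(g_1)\approx V(g_2)\approx 2B$, $V(y_0)\approx -2B$, $V(y_1)\approx 2B/n_y$; collapsing via $\attn_{\to\pred,1}\approx \attn_{\to\ans,0}$ from \Cref{lem-s22-attn-non} leaves $2B\cdot \attn_{\to\pred,2}$ up to the error tolerated by the lemma. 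Parts~2 and 4 are analogous, with $V(g_1)\approx -2B/n_y$ because $g_1$ lies outside the relevant fiber, producing the announced $(\attn_{\to\pred,2}-\attn_{\to\ans,1})\cdot 2B$ with the $\tilde O(B/(d n_y))$ residual coming from the $\tilde O(1/d)$ slack between $\attn_{\to\pred,1}$ and $\attn_{\to\ans,0}$; the ``else'' branch of part~4 follows immediately from \Cref{lem-non-activated-neuron-sym}.

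The main obstacle is part~3, where we must upgrade the Stage~1.2.1 activation in \Cref{lem-s21-lambda-5-sym} to a full non-activation. The only candidate neuron is $r_{g_1\cdot y_0}$, for which the plug-in produces
\[
\Lambda_{5,\tau(g_1(y_0)),r_{g_1\cdot y_0}}^{(t)}\ \approx\ 2B\bigl(\attn_{\to\pred,1}-\attn_{\to\ans,1}\bigr) + \tfrac{2B}{n_y}\bigl(\attn_{\to\ans,0}-\attn_{\to\pred,2}\bigr) + O(\delta),
\]
so non-activation amounts to forcing the first bracket to be strongly negative. The argument combines three inputs: the lower bound $\attn_{\to\pred,1}+\attn_{\to\ans,1}\geq \tfrac{4}{3}c_1$ from \Cref{lem-s22-attn-non}, first bullet; the diagonal--off-diagonal separation of $\Qb$ in \Cref{induction-s22-non}(b), which forces $\attn_{\to\pred,1}=\tilde O(1/d)$ and hence $\attn_{\to\ans,1}\geq (1-o(1))\tfrac{4}{3}c_1=\Omega(\log d/B)$; and \Cref{induction-s22-non}(d), which prevents $\attn_{\to\pred,2}-\attn_{\to\ans,1}$ from flipping sign by more than $c_1$ so that the $(2B/n_y)$ correction cannot overturn the sign of the leading term. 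Substituting back yields $\Lambda \leq -\tfrac{8}{3}\log d + O(B/n_y) \ll -\varrho$ because $n_y\to\infty$. The delicate part will be carrying these inequalities through without absorbing the constants, and ruling out the boundary case in which $\attn_{\to\pred,2}$ is atypically small (handled by the $\min\{\cdot,0\}$ in Induction~(d)). Once part~3 is secured, \Cref{lem-non-activated-neuron-sym} finishes off the ``none of $r\in\hat{\fA}_j$ can be activated'' clauses in the remaining sub-cases of part~4.
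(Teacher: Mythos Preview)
Your overall template (expand $\Lambda$ via \Cref{lem-lambda-char-sym}, substitute the feature magnitudes from \Cref{lem-prop-psi-sym}, and collapse using the attention balances) is exactly the approach the paper uses throughout this sequence of lemmas, and your treatment of parts~1, 2, and 4 is fine.

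There is, however, a genuine error in your part~3 argument. You claim that \Cref{induction-s22-non}(b) ``forces $\attn_{\to\pred,1}=\tilde O(1/d)$''. This conflates the off-diagonal \emph{entry of $\Qb$} with the resulting \emph{attention weight}. What (b) gives you is $[\Qb_{4,3}]_{\tau(x_1),\tau(x_0)}=\tilde O(1/d)$, so the unnormalized score $e^{[\Qb_{4,3}]_{s,s'}}\approx 1$. But at the start of Stage~1.2.2 the diagonals are only $\Omega(1)$ (see \Cref{lem-s21-end-non}), so the softmax denominator is $2+2e^{\Omega(1)}=\Theta(1)$ and $\attn_{\to\pred,1}$ is a small \emph{constant}, not $\tilde O(1/d)$. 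Your chain ``$\attn_{\to\pred,1}=\tilde O(1/d)\Rightarrow \attn_{\to\ans,1}\gtrsim \tfrac43 c_1$'' therefore does not hold at early times in the stage, and the bound $\Lambda\leq -\tfrac{8}{3}\log d$ you derive rests on a false premise.

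The fix is immediate and does not require $\attn_{\to\pred,1}$ to be tiny. From \Cref{lem-s22-attn-non}, point~2, you already have $\attn_{\to\pred,2}-\attn_{\to\ans,0}\geq \Omega(1)$; combined with point~3 ($\attn_{\to\pred,1}\approx\attn_{\to\ans,0}$) and \Cref{induction-s22-non}(c) ($\attn_{\to\ans,1}\geq \attn_{\to\pred,2}-c_1$), you get
\[
\attn_{\to\pred,1}-\attn_{\to\ans,1}\;\leq\;\attn_{\to\ans,0}-\attn_{\to\pred,2}+c_1+\tilde O(1/d)\;\leq\;-\Omega(1),
\]
so $\Lambda^{(t)}_{5,\tau(g_1(y_0)),r_{g_1\cdot y_0}}\leq -\Omega(B)+O(B/n_y)\ll -\varrho$. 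This is the route the paper's surrounding analysis takes (compare the analogous deactivation of $r_{g_1\cdot y_0}$ going from \Cref{lem-s21-lambda-1-sym} to \Cref{lem-s22-lambda-1-sym}).
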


\begin{lemma}\label{lem-s22-lambda-6-sym}
    If \Cref{induction-s22-non} holds for all iterations $t\in [T_{1,2,1,s}, T_{1,2,2,s})$, then given $\Zb^{2,1}\in \cE_{6}$, we have
\begin{enumerate}
    \item for the prediction $j_2$, $r\in\hat{\fA}_{j_2}\setminus \{r_{g_2\cdot y_1}\}$  cannot be activated, moreover, we have 
    \begin{align*}
        \Lambda^{(t)}_{5,j_2,r_{g_2\cdot y_1}} = \Big(\attn^{(t)}_{\ans,1\to \pred,2}+\attn^{(t)}_{\ans,1\to \pred,1}\Big)\cdot 2B \pm  O(\delta).
    \end{align*}
\item  for other  $j=g(y_1)$, where $g_2\notin\fiber_{j,y_1}$, we have  $r_{g\cdot y_1}$ cannot be activated, 
moreover, if there exists  $y\neq y_1$, s.t.,  $g_1, g_2\in \fiber_{j,y}$,  we have 
\begin{align*}
    &\Lambda^{(t)}_{5,j,r_{g_2\cdot y}}
     = \Big(\attn^{(t)}_{\ans,1\to \pred,2}-\attn^{(t)}_{\ans,1\to \ans,1}\Big) \cdot 2B +\tilde{O}\Big(\frac{B}{d\cdot n_y}\Big)+ O(\delta);
\end{align*}
besides,  other $r\in\hat{\fA}_{j}$ cannot be activated.
\end{enumerate}

\end{lemma}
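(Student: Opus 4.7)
}
The plan is to apply the general $\Lambda$-expansion from Lemma~\ref{lem-lambda-char-sym}(a) and then substitute the feature-magnitude profiles guaranteed by Lemma~\ref{lem-prop-psi-sym}, exploiting the two defining constraints of $\cE_6$: $y_0=y_1$ (so $V(y_0)=V(y_1)$ inside any neuron) and $g_1,g_2\in\fiber_{j_2,y_1}$ (so they act identically on $y_1$). Throughout I will write $\phi=(g,y)$ for a feature combination, and $r_{g\cdot y}$ for its unique activated neuron, and use the standard abbreviation $\attn_{\pred,\ell}=\attn^{(t)}_{\ans,1\to\pred,\ell}$, $\attn_{\ans,\ell-1}=\attn^{(t)}_{\ans,1\to\ans,\ell-1}$.

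For item (1), the activated neuron is $r_{g_2\cdot y_1}=r_{g_1\cdot y_1}$ because both group elements lie in the same fiber. Using Lemma~\ref{lem-prop-psi-sym}: $V_{j_2,r_{g_2\cdot y_1}}(g_1),V_{j_2,r_{g_2\cdot y_1}}(g_2)= 2B-O(\delta)$ and $V_{j_2,r_{g_2\cdot y_1}}(y_0)=V_{j_2,r_{g_2\cdot y_1}}(y_1)=\Theta(B/n_y)$. Plugging into Lemma~\ref{lem-lambda-char-sym}(a) and absorbing the $(\attn_{\ans,0}+\attn_{\ans,1})\,2B/n_y$ contribution (which is $O(B/n_y)\ll B$) into the stated $O(\delta)$-type error gives the claimed value $(\attn_{\pred,1}+\attn_{\pred,2})\cdot 2B\pm O(\delta)$. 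For any other $r\in\hat{\fA}_{j_2}\setminus\{r_{g_2\cdot y_1}\}$, the neuron corresponds to some $r_{g_2\cdot y'}$ with $y'\ne y_1$; the cancellation statements \eqref{attn-init-prop-sym-2}–\eqref{attn-init-prop-sym-3} then give $V(g_1),V(g_2)\approx -2B$ while $V(y_0)=V(y_1)\approx -2B$ as well, so $\Lambda$ falls below $-\varrho$ and the neuron is inactive.

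For item (2), consider $j=\tau(g(y_1))$ with $g_2\notin\fiber_{j,y_1}$ (equivalently $g_1\notin\fiber_{j,y_1}$, since $g_1(y_1)=g_2(y_1)$). On the neuron $r_{g\cdot y_1}$, cancellation gives $V(g_1)+V(y_1),V(g_2)+V(y_1)=O(\delta)$, so $V(g_1),V(g_2)\approx -V(y_1)\approx -2B/n_y$, while $V(y_0)=V(y_1)\approx 2B/n_y$. Substituting into Lemma~\ref{lem-lambda-char-sym}(a) and using $|\attn_{\pred,1}-\attn_{\ans,0}|\leq \tilde O(1/d)$ from Lemma~\ref{lem-s22-attn-non} yields
\[
\Lambda^{(t)}_{5,j,r_{g\cdot y_1}}
=(\attn_{\ans,1}-\attn_{\pred,2})\cdot\tfrac{2B}{n_y}+\tilde O\!\bigl(\tfrac{B}{d\,n_y}\bigr)+O(\delta).
\]
By Induction~\ref{induction-s22-non}(d) the leading coefficient is $\leq 0$, forcing $\Lambda<\varrho$; combined with the modified $\mathbf{sReLU}$ shape this means $r_{g\cdot y_1}$ is not activated. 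If moreover some $y\ne y_1$ satisfies $g_1,g_2\in\fiber_{j,y}$, then on $r_{g_2\cdot y}$ we have $V(g_1),V(g_2)\approx 2B$ (both in the new fiber) and $V(y_0)=V(y_1)\approx -2B$ (wrong value), giving $\Lambda=(\attn_{\pred,2}-\attn_{\ans,1})\cdot 2B\pm\tilde O(B/(d\,n_y))\pm O(\delta)$, as claimed. Any other $r\in\hat{\fA}_j$ encodes $(g',y')$ with $g'\notin\{g_1,g_2\}$ or $y'\notin\{y_1,y\}$; the bookkeeping in Lemma~\ref{lem-prop-psi-sym} then forces at least one of $V(g_1),V(g_2),V(y_0),V(y_1)\approx -2B$ without a compensating positive term, driving $\Lambda$ well below $-\varrho$.

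The main obstacle I expect is the bookkeeping in item (2), specifically distinguishing cleanly between the three classes of candidate neurons in $\hat\fA_j$ (those activating on the value-slot match $y=y_1$, on the alternative fiber $y\ne y_1$, or neither) and verifying that the cross-cancellation identities in Lemma~\ref{lem-prop-psi-sym} apply in exactly the desired direction. The $\sigma_0,\delta$ perturbations are harmless; the delicate step is that the coefficient $(\attn_{\ans,1}-\attn_{\pred,2})$ has an uncontrolled sign in principle, and it is only the hypothesis of Induction~\ref{induction-s22-non}(d)—itself inherited from Stage~1.2.1—that pins it down to be nonpositive, so this proof is essentially a careful case split that leverages the induction at exactly one point.
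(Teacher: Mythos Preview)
Your approach is correct and matches the paper's implicit method: the paper does not give an explicit proof for this lemma (nor for the parallel Stage~1.2.1 version, Lemma~\ref{lem-s21-lambda-6-sym}), relying instead on direct substitution of the feature magnitudes from Lemma~\ref{lem-prop-psi-sym} into the $\Lambda$-expansion of Lemma~\ref{lem-lambda-char-sym}(a), exactly as you do.

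Two minor remarks. First, in item~(1) your claim that $(\attn_{\ans,0}+\attn_{\ans,1})\cdot 2B/n_y$ can be ``absorbed into the stated $O(\delta)$-type error'' is not literally true since $B/n_y\gg\delta$; however, the lemma's own statement makes the same simplification (compare with Lemma~\ref{lem-s22-lambda-1-sym}, where the $O(B/n_y)$ term is kept explicit), so you are matching the paper's convention rather than introducing an error. Second, on $\cE_6$ we have $y_0=y_1$ and $g_1,g_2\in\fiber_{j_2,y_1}$, so in fact $\hat\fA_{j_2}=\{r_{g_2\cdot y_1}\}$ exactly---there are no ``other'' neurons $r_{g_2\cdot y'}$ with $y'\ne y_1$ to rule out, and that part of your argument is vacuous (but harmless). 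Your use of Induction~\ref{induction-s22-non}(d) to force $\attn_{\ans,1}-\attn_{\pred,2}\le 0$ for item~(2) is precisely the new ingredient that distinguishes Stage~1.2.2 from Stage~1.2.1, and you have identified it correctly.
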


\subsubsection{Gradient Lemma}
\begin{lemma}\label{lem-s22-gd1-non}
    If \Cref{induction-s22-non} holds for all iterations $t\in[T_{1,2,1,s}, T_{1,2,2,s})$,  given $s\in\tau(\X)$, if  $\E\big[\attn^{(t)}_{{\ans,1} \rightarrow \pred,2} \mid \tau(x_1)=s\big]\leq \frac{1}{2}$  we have
    \begin{align*}
        &\Big[-\nabla_{\Q^{(t)}_{4,3}}{\Loss^{2,2}_{5}}\Big]_{s,s}+ \Big[-\nabla_{\Q^{(t)}_{4,4}}{\Loss^{2,2}_{5}}\Big]_{s,s} \\
        &\geq \Omega\Big(\frac{B}{d}\Big)\cdot \E\bigg[(1-\logit^{(t)}_{5,j_2})\big| \tau(x_1)=s, \cE_1\bigg]+ \Omega\Big(\frac{B}{n_y d}\Big)\cdot\sum_{m=2}^3 \E\bigg[(1-\logit^{(t)}_{5,j_2})\big| \tau(x_1)=s, \cE_m\bigg].
    \end{align*}
    

\end{lemma}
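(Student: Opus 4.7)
The plan is to follow the template of \Cref{lem-s21-gd1-non} (specifically its regime (ii)), while substituting the sharper attention and activation bounds now available from \Cref{induction-s22-non} and \Cref{lem-s22-attn-non}. The first step is to invoke \Cref{lem-grad-sum-sym} and \Cref{lem-grad-decompositions-sym} to write
\begin{align*}
\bigl[-\nabla_{\Qb_{4,3}}\Loss^{2,2}_5\bigr]_{s,s}+\bigl[-\nabla_{\Qb_{4,4}}\Loss^{2,2}_5\bigr]_{s,s}
= \E[\cN^{(t)}_{s,2,\mathrm{I}}]+\E[\cN^{(t)}_{s,2,\mathrm{II}}]+\E[\cN^{(t)}_{s,2,\mathrm{III}}],
\end{align*}
and immediately discard $\E[\cN^{(t)}_{s,2,\mathrm{III}}]=\pm\tilde{O}(\sigma_0^q)=1/\poly(d)$. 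The remaining expectation is then split across the partition $\{\cE_1,\dots,\cE_6\}$ defined in \eqref{eq-event1-non}--\eqref{eq-event6-non}.

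For the main event $\cE_1$, I would use \Cref{lem-s22-lambda-1-sym} together with \Cref{lem-s22-attn-non}(2), which guarantees $\attn^{(t)}_{\ans,1\to\pred,2}-\attn^{(t)}_{\ans,1\to\ans,0}=\Omega(1)$, so that $\Lambda^{(t)}_{5,j_2,r_{g_2\cdot y_1}}=\Omega(B)$ is in the linear regime. Following the exact calculation that produced \eqref{eq-s21-gd1-non-2}, the dominant contribution is
\begin{align*}
\E\bigl[(\cN^{(t)}_{s,2,\mathrm{I}}+\cN^{(t)}_{s,2,\mathrm{II}})\1_{\cE_1}\bigr]
\;\ge\;\tfrac12\,\E\Bigl[(1-\logit^{(t)}_{5,j_2})\,\bigl(1-\attn^{(t)}_{\ans,1\to\ans,1}-\attn^{(t)}_{\ans,1\to\pred,2}\bigr)\cdot 2B\,\1_{\tau(x_1)=s}\1_{\cE_1}\Bigr],
\end{align*}
minus a negative correction of order $\max\{\attn^{(t)}_{\ans,1\to\pred,2}-\attn^{(t)}_{\ans,1\to\ans,1},\,\Theta(1/n_y)\}\cdot 2B$ coming from confounding predictions in $\cN^{(t)}_{s,2,\mathrm{II}}$. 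The hypothesis $\E[\attn^{(t)}_{\ans,1\to\pred,2}\mid\tau(x_1)=s]\le \tfrac12$ combined with \Cref{induction-s22-non}(c) ($\attn^{(t)}_{\ans,1\to\pred,2}-\attn^{(t)}_{\ans,1\to\ans,1}\le c_1\ll 1/2$) guarantees $1-\attn^{(t)}_{\ans,1\to\ans,1}-\attn^{(t)}_{\ans,1\to\pred,2}=\Omega(1)$, so the positive term strictly dominates and we recover $\Omega(B/d)\cdot\E[(1-\logit^{(t)}_{5,j_2})\mid\tau(x_1)=s,\cE_1]$.

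For $\cE_2$ and $\cE_3$, which each occur with probability $\Theta(1/n_y)$, I would reuse \Cref{lem-s22-lambda-2-sym} and \Cref{lem-s22-lambda-3-sym}. For $\cE_2$ the positive $\cN^{(t)}_{s,2,\mathrm{I}}$ contribution is $\Omega(B/(d\,n_y))\cdot\E[(1-\logit^{(t)}_{5,j_2})\mid\tau(x_1)=s,\cE_2]$ by exactly the same calculation as in the $\cE_1$ case, and the potential negative piece coming from $\logit^{(t)}_{5,\tau(g_1(y_1))}$ in $\cN^{(t)}_{s,2,\mathrm{II}}$ is killed by the cancellation $\attn^{(t)}_{\ans,1\to\ans,0}V_{j,r_{g_1\cdot y_1}}(y_0)+\attn^{(t)}_{\ans,1\to\pred,1}V_{j,r_{g_1\cdot y_1}}(g_1)\approx 0$ combined with the bound $|\Lambda^{(t)}_{5,\tau(g_1(y_1)),r_{g_1\cdot y_1}}|\le O(B/n_y)$. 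For $\cE_3$ the situation is the reverse: the leading positive contribution shifts to $\cN^{(t)}_{s,2,\mathrm{II}}$ through the $\tau(g_1(y_1))$ prediction, and the same calculation yields the $\Omega(B/(d\,n_y))$ lower bound. Events $\cE_4,\cE_5,\cE_6$ are either sign-favorable or contribute at most $\tilde{O}(1/\poly(d))$ relative to $\E[(1-\logit^{(t)}_{5,j_2})\mid\tau(x_1)=s,\cE_1]$, by the same arguments used in the corresponding parts of \Cref{lem-s21-gd1-non}.

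The main obstacle is the careful bookkeeping under $\cE_2$: several incorrect predictions (especially $\tau(g_1(y_1))$ and $\tau(g_2(y))$ for auxiliary $y$ with $g_1(y)=g_2(y)$) can carry non-negligible logit mass, and their $\cN^{(t)}_{s,2,\mathrm{II}}$ contributions involve differences $\attn^{(t)}_{\ans,1\to\ans,0}V_{j,r}(y_0)+\attn^{(t)}_{\ans,1\to\pred,1}V_{j,r}(g_1)-(1-\attn^{(t)}_{\ans,1\to\ans,1}-\attn^{(t)}_{\ans,1\to\pred,2})\Lambda_{5,j,r}$ whose sign flips depending on the relative magnitudes of the three attention weights. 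The hypothesis $\E[\attn^{(t)}_{\ans,1\to\pred,2}\mid\tau(x_1)=s]\le \tfrac12$ together with \Cref{induction-s22-non}(d) (which forces $\attn^{(t)}_{\ans,1\to\pred,1}\ge c_2$ whenever $\attn^{(t)}_{\ans,1\to\ans,1}>\attn^{(t)}_{\ans,1\to\pred,2}$) is exactly what is needed to verify that the combined expression is nonnegative up to $O(B/(n_y^2 d))$ error terms. I expect this case-splitting to be where the bulk of the technical work lies, with the remaining cases following by direct adaptation of the existing Stage~1.2.1 argument.
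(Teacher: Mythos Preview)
Your proposal is correct and follows essentially the same approach as the paper: the same gradient-sum decomposition via \Cref{lem-grad-sum-sym} and \Cref{lem-grad-decompositions-sym}, the same partition over $\cE_1,\dots,\cE_6$, and the same use of \Cref{lem-s22-attn-non}(2) to place $\Lambda^{(t)}_{5,j_2,r_{g_2\cdot y_1}}$ in the linear regime on $\cE_1$. You also correctly identify the $\cE_2$ case analysis (for $j'_2=\tau(g_2(y_0))$ and for the auxiliary $j=\tau(g_2(y))$ with $g_1(y)=g_2(y)$) as the main technical work; the paper handles the latter via a sub-split on whether $\attn^{(t)}_{\ans,1\to\ans,1}-\attn^{(t)}_{\ans,1\to\pred,2}\lessgtr c_1/4$, which is the concrete form of the case-split you anticipate.
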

\begin{proof}  
    The analysis follows the similar idea as \Cref{lem-s21-gd1-non}, while \Cref{induction-s22-non} ensures that $\E\big[\attn^{(t)}_{{\ans,1} \rightarrow \pred,2}-\attn^{(t)}_{{\ans,1} \rightarrow \ans,0}\mid \tau(x_1)=s\big]\geq \Omega(1)>\frac{\varrho}{B}$, which falls into the regime (ii).  Thus
    for the main event $\cE_1$, the term $\Lambda^{(t)}{5,j_2, r{g_2\cdot y_1}}$ is guaranteed to remain within the linear regime. 
  We can first directly upper bound the term $\cN^{(t)}_{s,2,\rom3}$ by $\tilde{O}(\sigma_0^q)$. 

    \begin{enumerate}[(a)]
        \item 
        For $\Zb^{2,1}\in \cE_{1}$, 
       by \Cref{induction-s22-non} and \Cref{lem-s22-lambda-1-sym}, we can obtain 
        \begin{align}
            &\E\bigg[ \cN^{(t)}_{s,2,\rom1} \1_{\cE_1}\bigg]\notag\\
            &~~=\E\bigg[  (1-\logit^{(t)}_{5,j_2})\cdot \Big(  
\Big( -\attn^{(t)}_{{\ans,1} \rightarrow \ans,0} \cdot V_{j_2,  r_{g_2\cdot y_1}}(y_0)-\attn^{(t)}_{{\ans,1} \rightarrow \pred,1} \cdot V_{j_2,  r_{g_2\cdot y_1}}(g_1)\notag\\
               &~~~+\big(1-\attn^{(t)}_{{\ans,1} \rightarrow \ans,1}-\attn^{(t)}_{{\ans,1} \rightarrow \pred,2}\big) \Lambda^{(t)}_{5,j_2, r_{g_2\cdot y_1}}\pm\tilde{O}(\sigma_0) \Big)\pm \tilde{O}(\delta^{q}) \Big)\1_{\tau(x_1)=s}\1_{\cE_1}\bigg]\label{eq-s22-gd1-non-2} \\ 
             &  \geq   \Omega\Big(\frac{B}{d}\Big)\cdot  \E\bigg[ (1-\logit^{(t)}_{5,j_2})\big| \tau(x_1)=s, \cE_1\bigg]. \notag
        \end{align}
             Moving to $\cN^{(t)}_{s,2,\rom2}$, by \Cref{lem-s22-lambda-1-sym}, we have
             \begin{itemize}
                  \item for $j=\tau(g_1(y_1))$, $r=r_{g_1\cdot y_1}$, due to the cancellation of $\attn^{(t)}_{{\ans,1} \rightarrow \ans,0} \cdot V_{j,  r}(y_0)+\attn^{(t)}_{{\ans,1} \rightarrow \pred,1} \cdot V_{j,  r}(g_1)$, we have
    \begin{align*}
                    &\Big( \attn^{(t)}_{{\ans,1} \rightarrow \ans,0} \cdot V_{j,  r}(y_0)+\attn^{(t)}_{{\ans,1} \rightarrow \pred,1} \cdot V_{j,  r}(g_1)\\
                                          &~~~~~~~-\big(1-\attn^{(t)}_{{\ans,1} \rightarrow \ans,1}-\attn^{(t)}_{{\ans,1} \rightarrow \pred,2}\big) \Lambda^{(t)}_{5,j, r}\pm\tilde{O}(\sigma_0) \Big)\\
                                          & \geq -\big(1-\attn^{(t)}_{{\ans,1} \rightarrow \ans,1}-\attn^{(t)}_{{\ans,1} \rightarrow \pred,2}\big) \Lambda^{(t)}_{5,j, r}.
                                      \end{align*}
                                      \item for $j=\tau(g_2(y_0))$, $r=r_{g_2\cdot y_0}$
                                      \begin{align*}
                      &\Big( \attn^{(t)}_{{\ans,1} \rightarrow \ans,0} \cdot V_{j,  r}(y_0)+\attn^{(t)}_{{\ans,1} \rightarrow \pred,1} \cdot V_{j,  r}(g_1)\\
                                            &~~~~~~~-\big(1-\attn^{(t)}_{{\ans,1} \rightarrow \ans,1}-\attn^{(t)}_{{\ans,1} \rightarrow \pred,2}\big) \Lambda^{(t)}_{5,j, r}\pm\tilde{O}(\sigma_0) \Big) \\
                                         &   \geq -\big(1-\attn^{(t)}_{{\ans,1} \rightarrow \ans,1}-\attn^{(t)}_{{\ans,1} \rightarrow \pred,2}\big) \Lambda^{(t)}_{5,j, r}.
                                       \end{align*}
                                        \item for $j=g_1(y)$, where $\exists y\neq y_0, y_1$, s.t., $g_{2}(y)=g_1(y)$, we have 
                \begin{align*}
                                            &\Big( \attn^{(t)}_{{\ans,1} \rightarrow \ans,0} \cdot V_{j,  r}(y_0)+\attn^{(t)}_{{\ans,1} \rightarrow \pred,1} \cdot V_{j,  r}(g_1)\\
                                                                  &~~~~~~~-\big(1-\attn^{(t)}_{{\ans,1} \rightarrow \ans,1}-\attn^{(t)}_{{\ans,1} \rightarrow \pred,2}\big) \Lambda^{(t)}_{5,j, r}\pm\tilde{O}(\sigma_0) \Big)\\
                                                                  & \geq -\big(1-\attn^{(t)}_{{\ans,1} \rightarrow \ans,1}-\attn^{(t)}_{{\ans,1} \rightarrow \pred,2}\big) \Lambda^{(t)}_{5,j, r}.
                                                              \end{align*}
             \end{itemize}
             Putting them together, and upper bound $\sum_{j\neq j_2\in\tau(\cY)}\logit^{(t)}_{5,j} $ by $1-\logit^{(t)}_{5,j_2}$, we have 
             \begin{align*}
                & \E\bigg[ \cN^{(t)}_{s,2,\rom2} \1_{\cE_1}\bigg]\\
                &= \E\bigg[  \sum_{j\neq j_2\in\tau(\cY)}\logit^{(t)}_{5,j} \cdot \Big(\sum_{r\in\hat{\fA}_{j}}\ReLU^{\prime}(\Lambda_{5,j,r
                  })\cdot  \\
                 &~~~~~~~~~~\Big( \attn^{(t)}_{{\ans,1} \rightarrow \ans,0} \cdot V_{j,  r}(y_0)+\attn^{(t)}_{{\ans,1} \rightarrow \pred,1} \cdot V_{j,  r}(g_1)\\
                  &~~~~~~~-\big(1-\attn^{(t)}_{{\ans,1} \rightarrow \ans,1}-\attn^{(t)}_{{\ans,1} \rightarrow \pred,2}\big) \Lambda^{(t)}_{5,j, r}\pm\tilde{O}(\sigma_0) \Big)\pm \tilde{O}(\delta^{q}) \Big)\1_{\tau(x_1)=s}\1_{\cE_1}\bigg]\\
                  &\geq -\E\bigg[\Big(1-\logit^{(t)}_{5,j_2} \Big)\cdot \big(1-\attn^{(t)}_{{\ans,1} \rightarrow \ans,1}-\attn^{(t)}_{{\ans,1} \rightarrow \pred,2}\big)\\
                  &~~~\cdot \max_{y\neq y_0,y_1} \Big\{ \Lambda^{(t)}_{5,\tau(g_1(y_1)), r_{g_1\cdot y_1}}, \Lambda^{(t)}_{5,\tau(g_2(y_0)), r_{g_2\cdot y_0}}\Big\}\1_{\tau(x_1)=s}\1_{\cE_1}\bigg],
              \end{align*}
              where the last inequality is due to the fact that $\Lambda^{(t)}_{5,\tau(g_2(y_0)), r_{g_2\cdot y_0}}=\Lambda^{(t)}_{5,\tau(g_2(y)), r_{g_2\cdot y}}\pm \tilde{O}(\frac{B}{d\cdot n_y})$ for $y\neq y_0, y_1$ s.t., $g_1(y)=g_2(y)$.
              Notice that 
              \begin{align*}
                 \max_{y\neq y_0,y_1} \Big\{ \Lambda^{(t)}_{5,\tau(g_1(y_1)), r_{g_1\cdot y_1}}, \Lambda^{(t)}_{5,\tau(g_2(y_0)), r_{g_2\cdot y_0}}\Big\}\leq 
             \\
             \max\bigg\{\attn^{(t)}_{{\ans,1} \rightarrow \pred,2}-\attn^{(t)}_{{\ans,1} \rightarrow \ans,1}, \Theta(\frac{1}{n_y})\bigg\}2B,
              \end{align*}
     while 
     \begin{align*}
         \eqref{eq-s22-gd1-non-2}&\geq \E\bigg[\Big(1-\logit^{(t)}_{5,j_2} \Big)\cdot \attn^{(t)}_{{\ans,1} \rightarrow \ans,0}\cdot 2B\1_{\tau(x_1)=s}\1_{\cE_1}\bigg]\\
         &=  \E\bigg[\Big(1-\logit^{(t)}_{5,j_2} \Big)\cdot \big(1-\attn^{(t)}_{{\ans,1} \rightarrow \ans,1}-\attn^{(t)}_{{\ans,1} \rightarrow \pred,2}\big)B\1_{\tau(x_1)=s}\1_{\cE_1}\bigg].
     \end{align*}
     
     Since by \Cref{induction-s22-non}, $\attn^{(t)}_{{\ans,1} \rightarrow \pred,2}-\attn^{(t)}_{{\ans,1} \rightarrow \ans,1}\leq c_1\ll \frac{1}{2}$, thus we have 
     \begin{align*}
         \E\bigg[ \cN^{(t)}_{s,2,\rom1} \1_{\cE_1}\bigg]+\E\bigg[ \cN^{(t)}_{s,2,\rom2} \1_{\cE_1}\bigg]\geq  \Omega\Big(\frac{B}{d}\Big)\cdot  \E\bigg[ (1-\logit^{(t)}_{5,j_2})\big| \tau(x_1)=s, \cE_1\bigg]. 
     \end{align*}
         \item For $\Zb^{2,1}\in \cE_{2}$, 
          $\cN^{(t)}_{s,2,\rom1}$ can be bounded analogously to \eqref{eq-s22-gd1-non-2}, yielding
        \begin{align*}
            \E\bigg[ \cN^{(t)}_{s,2,\rom1} \1_{\cE_2}\bigg]\geq  \Omega\Big(\frac{B}{d\cdot n_y}\Big)\cdot  \E\bigg[(1-\logit^{(t)}_{5,j_2})\big| \tau(x_1)=s, \cE_2\bigg]\geq 0.
        \end{align*}
        Moving to $\cN^{(t)}_{s,2,\rom2}$,
        \begin{itemize}
            \item For $j=\tau(g_1(y_1))$, we obtain
            \begin{align}
                &= \E\Bigg[  \logit_{5,j}^{(t)}\Big(\ReLU^{\prime}(\Lambda_{5,j,r_{g_1\cdot y_1}})
                  \cdot \notag \\
                 &~~~\Big( \attn^{(t)}_{{\ans,1} \rightarrow \ans,0}\cdot V_{j,  r_{g_1\cdot y_1}}(y_0)
                 +\attn^{(t)}_{{\ans,1} \rightarrow \pred,1}\cdot V_{j,  r_{g_1\cdot y_1}}(g_1)\label{eq-s22-gd1-non-3}\\
                  &~~-\big(1-\attn^{(t)}_{{\ans,1}\rightarrow \ans,1}
                  -\attn^{(t)}_{{\ans,1}\rightarrow \pred,2}\big)\Lambda^{(t)}_{5,j, r}
                  \pm\tilde{O}(\sigma_0)\Big)\pm \tilde{O}(\delta^{q}) \Big)
                  \1_{\tau(x_1)=s}\1_{\cE_2}\Bigg] \notag\\
                  &\geq -O\!\left(\frac{B}{n^2_y\cdot d}\right)
                  \cdot \E\Big[(1-\logit^{(t)}_{5,j_2})\,\big|\, \tau(x_1)=s, \cE_2\Big] \notag,
            \end{align}
            where the inequality follows from the cancellation of the term 
            \[
              \attn^{(t)}_{{\ans,1} \rightarrow \ans,0}\cdot V_{j,  r_{g_1\cdot y_1}}(y_0)
              +\attn^{(t)}_{{\ans,1} \rightarrow \pred,1}\cdot V_{j,  r_{g_1\cdot y_1}}(g_1),
            \]
            together with the fact that
            \begin{align*}
                \Lambda^{(t)}_{5,\tau(g_1(y_1)),r_{g_1\cdot y_1}}
                = \Big(\attn^{(t)}_{\ans,1\to \ans,1}-\attn^{(t)}_{\ans,1\to \pred,2}\Big)\cdot \frac{2B}{n_y}
                +O(\delta)\;\leq\; O\!\left(\frac{B}{n_y}\right).
            \end{align*}
        
            \item For $j'_2=\tau(g_2(y_0))=\tau(g_1(y_0))$, it is clear that
            \begin{align*}
                  & \E\Bigg[  \logit^{(t)}_{5,j'_2} \Big(\ReLU^{\prime}(\Lambda^{(t)}_{5,j'_2,r_{g_1\cdot y_0}})
                    \cdot  \\
                   &~~~\Big( \attn^{(t)}_{{\ans,1}\rightarrow \ans,0}\cdot V_{j'_2,  r_{g_1\cdot y_0}}(y_0)
                   +\attn^{(t)}_{{\ans,1}\rightarrow \pred,1}\cdot V_{j'_2,  r_{g_1\cdot y_0}}(g_1)\\
                    &-\big(1-\attn^{(t)}_{{\ans,1}\rightarrow \ans,1}
                    -\attn^{(t)}_{{\ans,1}\rightarrow \pred,2}\big)\Lambda^{(t)}_{5,j'_2, r_{g_1\cdot y_0}}
                    \pm\tilde{O}(\sigma_0)\Big)\pm \tilde{O}(\delta^{q}) \Big)
                    \1_{\tau(x_1)=s}\1_{\cE_2}\Bigg]\\
                    &\geq  0,
                \end{align*}
        where the last inequality is due to the fact that 
        \begin{align*}
           & \attn^{(t)}_{{\ans,1}\rightarrow \pred,1}\cdot V_{j'_2,  r_{g_1\cdot y_0}}(g_1)-\big(1-\attn^{(t)}_{{\ans,1}\rightarrow \ans,1}
                    -\attn^{(t)}_{{\ans,1}\rightarrow \pred,2}\big)\Lambda^{(t)}_{5,j'_2, r_{g_1\cdot y_0}}\\
&\geq \attn^{(t)}_{{\ans,1}\rightarrow \pred,1}\bigg(1-2\Big(\attn^{(t)}_{{\ans,1}\rightarrow \pred,2}+\attn^{(t)}_{{\ans,1}\rightarrow \pred,1}-\attn^{(t)}_{{\ans,1}\rightarrow \ans,1}\Big)\bigg)2B\\
&\geq \attn^{(t)}_{{\ans,1}\rightarrow \pred,1}\bigg(1-2\Big(c_1+0.2\Big)\bigg)2B\geq 0.
        \end{align*}
            \item For $j=\tau(g_2(y))$ with some $y\not=y_0,y_1$ such that $g_1(y)=g_2(y)$, we distinguish two cases:
                \begin{itemize}
                    \item If 
                    \[
                      \E\big[\attn^{(t)}_{{\ans,1}\rightarrow \ans,1}-\attn^{(t)}_{{\ans,1}\rightarrow \pred,2}\mid \tau(x_1)=s\big]\leq \frac{c_1}{4},
                    \]
                    then by \Cref{induction-s22-non} and \Cref{lem-s22-lambda-2-sym}, we have   we have  $\Lambda^{(t)}_{5,j'_2,r_{g_2\cdot y_0}}-\Lambda^{(t)}_{5,j,r_{g_2\cdot y}}\geq \big(-\frac{c_1}{6}+\attn^{(t)}_{{\ans,1}\rightarrow \pred,1}\big)2B\geq (-\frac{c_1}{4}+\frac{3}{4}c_1)2B=c_1B$, which implies     
                    \[
                      \sum_{y\neq y_0,y_1}\logit^{(t)}_{5,\tau(g_2(y))}\1_{g_1(y)=g_2(y)}\ll \tilde{O}\!\left(\tfrac{1}{d^{1/2}}\right)\cdot \logit^{(t)}_{5,j_2'}.
                    \]
                    Consequently,
                    \begin{align*}
                        &\sum_{y\neq y_0,y_1}\Bigg| \E\Bigg[  \logit^{(t)}_{5,j}\Big(\ReLU^{\prime}(\Lambda^{(t)}_{5,j,r_{g_1\cdot y}})
                          \cdot  \\
                         &~~~\Big( \attn^{(t)}_{{\ans,1}\rightarrow \ans,0}\cdot V_{j,  r_{g_1\cdot y}}(y_0)
                         +\attn^{(t)}_{{\ans,1}\rightarrow \pred,1}\cdot V_{j,  r_{g_1\cdot y}}(g_1)\\
                          &-\big(1-\attn^{(t)}_{{\ans,1}\rightarrow \ans,1}
                          -\attn^{(t)}_{{\ans,1}\rightarrow \pred,2}\big)\Lambda^{(t)}_{5,j, r_{g_1\cdot y}}
                          \pm\tilde{O}(\sigma_0)\Big)\pm \tilde{O}(\delta^{q}) \Big)
                          \\            &~~~~\1_{\tau(x_1)=s}\1_{\cE_2}\1_{g_1(y)=g_2(y)}\Bigg]\Bigg|
                          \leq O\!\left(\tfrac{1}{d^{1/2}}\right)\cdot \eqref{eq-s22-gd1-non-3}.
                      \end{align*}
        
                    \item If 
                    \[
                      \E\big[\attn^{(t)}_{{\ans,1}\rightarrow \ans,1}-\attn^{(t)}_{{\ans,1}\rightarrow \pred,2}\mid \tau(x_1)=s\big]\geq \frac{c_1}{4},
                    \]
                    then by \Cref{lem-s22-lambda-2-sym}, the activation $\Lambda^{(t)}_{5,j, r_{g_1\cdot y}}$ cannot be triggered.
                \end{itemize}
        \end{itemize}

       Putting the above discussion together, we have
       \begin{align*}
        \E\bigg[ \cN^{(t)}_{s,2,\rom2} \1_{\cE_1}\bigg]+  \E\bigg[ \cN^{(t)}_{s,2,\rom2} \1_{\cE_2}\bigg]\geq   \Omega\Big(\frac{B}{d\cdot n_y}\Big)\cdot  \E\bigg[(1-\logit^{(t)}_{5,j_2})\big| \tau(x_1)=s, \cE_2\bigg].
       \end{align*}
       \item For $\Zb^{2,1}\in \cE_3$, the gradient admits an analysis analogous to that in \Cref{lem-s21-gd1-non}. 
       In particular, by \Cref{induction-s22-non} and \Cref{lem-s22-lambda-3-sym}, we first obtain the following logit bound:
       \begin{align*}
           \logit_{5,j}^{(t)} \;\leq\; \frac{1}{\poly d}\cdot \Big(1-\logit^{(t)}_{5,j_2}\Big),
           \qquad j\neq j_2,\, \tau\!\big(g_1(y_1)\big).
       \end{align*}
       Therefore,
       \begin{align*}
           &\E\Big[ \cN^{(t)}_{s,2,\rom1}\,\1_{\cE_3}\Big]\\
           &= \E\Bigg[ (1-\logit^{(t)}_{5,j_2}) \cdot 
             \Big( -\attn^{(t)}_{{\ans,1}\rightarrow \ans,0}\cdot V_{j_2, r_{g_2\cdot y_1}}(y_1)
             -\attn^{(t)}_{{\ans,1}\rightarrow \pred,1}\cdot V_{j_2, r_{g_2\cdot y_1}}(g_1)\\
             &\hspace{1cm}+\big(1-\attn^{(t)}_{{\ans,1}\rightarrow \ans,1}
             -\attn^{(t)}_{{\ans,1}\rightarrow \pred,2}\big)\Lambda^{(t)}_{5,j_2, r_{g_2\cdot y_1}}
             \pm\tilde{O}(\sigma_0)\pm\tilde{O}(\delta^{q})\Big)
             \1_{\tau(x_1)=s}\1_{\cE_3}\Bigg]\\
           &\geq \;\Omega\!\left(\frac{B}{n_y\cdot d}\right)\cdot  
           \E\Big[(1-\logit^{(t)}_{5,j_2})\;\big|\;\tau(x_1)=s,\, \cE_3\Big].
       \end{align*}
       
       Moving to $\cN^{(t)}_{s,2,\rom2}$, we have
       \begin{align*}
           &\E\Big[\cN^{(t)}_{s,2,\rom2}\,\1_{\cE_3}\Big]\\
           &\geq \E\Bigg[ \logit^{(t)}_{5,\tau(g_1(y_1))}\cdot 
               \Big( \attn^{(t)}_{{\ans,1}\rightarrow \ans,0}\cdot V_{\tau(g_1(y_1)), r_{g_1\cdot y_1}}(y_1)
               +\attn^{(t)}_{{\ans,1}\rightarrow \pred,1}\cdot V_{\tau(g_1(y_1)), r}(g_1)\\
           &
               -\big(1-\attn^{(t)}_{{\ans,1}\rightarrow \ans,1}
               -\attn^{(t)}_{{\ans,1}\rightarrow \pred,2}\big)\Lambda^{(t)}_{5,\tau(g_1(y_1)), r_{g_1\cdot y_1}}
               \pm\tilde{O}(\sigma_0)\pm\tilde{O}(\delta^{q})\Big)\,
               \1_{\tau(x_1)=s}\1_{\cE_3}\Bigg]\\
           &\quad - O\!\left(\frac{B}{d\cdot \poly d}\right)\cdot
           \E\Big[(1-\logit^{(t)}_{5,j_2})\;\big|\;\tau(x_1)=s,\, \cE_3\Big].
       \end{align*}
       
       Combining the two bounds, we conclude
       \begin{align*}
           \E\Big[\cN^{(t)}_{s,2,\rom1}\,\1_{\cE_3}\Big]
           +\E\Big[\cN^{(t)}_{s,2,\rom2}\,\1_{\cE_3}\Big]
           \;\;\geq\;\; \Omega\!\left(\frac{B}{n_y\cdot d}\right)\cdot  
           \E\Big[(1-\logit^{(t)}_{5,j_2})\;\big|\;\tau(x_1)=s,\, \cE_3\Big].
       \end{align*}
       
    \item For $\Zb^{2,1}\in \cE_{4}$, in analogy with \Cref{lem-s21-gd1-non}, we obtain the following crude bound:
    \begin{align*}
        \E\Big[\cN^{(t)}_{s,2,\rom1}\,\1_{\cE_4}\Big]
        +\E\Big[\cN^{(t)}_{s,2,\rom2}\,\1_{\cE_4}\Big]
        \;\;\geq\; 0.
    \end{align*}

      \item For $\Zb^{2,1}\in \cE_5\cup\cE_6$, by \Cref{induction-s22-non}, \Cref{lem-s22-lambda-5-sym} and \Cref{lem-s22-lambda-6-sym}, 
      we obtain the following logit condition:
      \[
          1-\logit^{(t)}_{5, j_2}, \;\;\logit^{(t)}_{5,j}
          \;\;\leq\;\; \frac{1}{\poly d}\cdot  
          \E\Big[(1-\logit^{(t)}_{5,j_2})\;\big|\;\tau(x_1)=s,\, \cE_1\Big],
          \qquad j\neq j_2.
      \]
      Hence, $\cN^{(t)}_{s,2,\rom1}$ and $\cN^{(t)}_{s,2,\rom2}$ can be bounded as
      \begin{align*}
          &\Big|\E\big[\cN^{(t)}_{s,2,\rom1}\,\1_{\cE_m}\big]\Big|, \quad 
           \Big|\E\big[\cN^{(t)}_{s,2,\rom2}\,\1_{\cE_m}\big]\Big| \\
          &\hspace{2cm}\leq O\!\left(\frac{B}{d n_y}\cdot\frac{1}{\poly d}\right)\cdot  
          \E\Big[(1-\logit^{(t)}_{5,j_2})\;\big|\;\tau(x_1)=s,\, \cE_1\Big],
          \qquad m\in\{5,6\}.
      \end{align*}
      
      Moreover, for $\Zb^{2,1}\in \cE_6$, if 
      \[
         \attn^{(t)}_{{\ans,1}\rightarrow \pred,1}+\attn^{(t)}_{{\ans,1}\rightarrow \pred,2}>\tfrac{1}{2},
      \]
      then $\cN^{(t)}_{s,2,\rom1}=0$. 
      Nevertheless, due to the above order-wise bound, this observation does not affect the overall analysis.
      
    \end{enumerate}  
    Putting everything together, we have 
    \begin{align*}
        &\Big[-\nabla_{\Q^{(t)}_{4,3}}{\Loss^{2,2}_{5}}\Big]_{s,s}+ \Big[-\nabla_{\Q^{(t)}_{4,4}}{\Loss^{2,2}_{5}}\Big]_{s,s} =\sum_{\kappa\in\{\rom1,\rom2,\rom3\}}\sum_{m\in[6]}\E\bigg[ \cN^{(t)}_{s,2,\kappa} \1_{\cE_m}\bigg]\\
        &\geq \Omega\Big(\frac{B}{d}\Big)\cdot \E\bigg[(1-\logit^{(t)}_{5,j_2})\big| \tau(x_1)=s, \cE_1\bigg]+ \Omega\Big(\frac{B}{n_y d}\Big)\cdot\sum_{m=2}^3 \E\bigg[(1-\logit^{(t)}_{5,j_2})\big| \tau(x_1)=s, \cE_m\bigg].
    \end{align*}


\end{proof}

\begin{lemma}\label{lem-s22-grad-2-non}
    If \Cref{induction-s22-non} holds for all iterations $t\in [T_{1,2,1,s}, T_{1,2,2,s})$, given $s\in\tau(\X)$,  for $[\Qb_{4,p}]_{s,s'}$, $p\in\{3,4\}$, $s'\not=s\in\tau(\X)$,  we have
    \begin{align*}
        \bigg|\Big[-\nabla_{\Q^{(t)}_{4,p}}{\Loss^{2,2}_{5}}\Big]_{s,s'}\bigg| \leq O\Big(\frac{1}{d}\Big) \bigg|\Big[-\nabla_{\Q^{(t)}_{4,p}}{\Loss^{2,2}_{5}}\Big]_{s,s}\bigg| .
    \end{align*}
  \end{lemma}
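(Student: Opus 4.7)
The plan is to mirror the analysis used to establish the analogous Stage 1.2.1 bound in \Cref{lem-s21-grad-2-non}, adapted to the Stage 1.2.2 setting. The starting point is the explicit off-diagonal gradient expressions from \Cref{lem-refined-grad-Q43-sym,lem-refined-grad-Q44-sym}, which differ from their diagonal counterparts in only three places: (i) the conditioning event is $\{\tau(x_1)=s,\tau(x_0)=s'\}$ rather than $\{\tau(x_1)=s\}$; (ii) the attention weight appearing as a prefactor is $\attn_{\ans,1\to\pred,1}$ (resp.\ $\attn_{\ans,1\to\ans,0}$) rather than $\attn_{\ans,1\to\pred,2}$ (resp.\ $\attn_{\ans,1\to\ans,1}$); and (iii) the inner product inside the bracket involves $\Zb_{\pred,1}$ or $\Zb_{\ans,0}$, so the relevant feature is $V_{j,r}(g_1)$ or $V_{j,r}(y_0)$ rather than $V_{j,r}(g_2)$ or $V_{j,r}(y_1)$. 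The goal is to show each of (i)--(iii) contributes at most a constant multiplicative factor, except for (i), which provides the desired $O(1/d)$ gain.

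The first step is the probabilistic gain from (i). Conditioning on $\tau(x_1)=s$ and additionally on $\tau(x_0)=s'$ introduces a factor of $1/(|\cX|-1)=\Theta(1/d)$ relative to conditioning only on $\tau(x_1)=s$, since $x_0$ is sampled without replacement uniformly over the remaining variables (\Cref{assump:lego-data-distribution}). Second, for (ii), the hypothesis \Cref{induction-s22-non} together with \Cref{lem-s22-attn-non} gives $\attn^{(t)}_{\ans,1\to\pred,1},\attn^{(t)}_{\ans,1\to\ans,0}\le O(1)$ while $\attn^{(t)}_{\ans,1\to\pred,2},\attn^{(t)}_{\ans,1\to\ans,1}=\Theta(1)$, so the prefactor ratio is $O(1)$. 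Third, for (iii), the feature magnitudes $|V_{j,r}(g_1)|$ and $|V_{j,r}(y_0)|$ are of the same order $O(B)$ as $|V_{j,r}(g_2)|$ and $|V_{j,r}(y_1)|$ by the end-of-stage-1.1 structure summarized in \Cref{lem-prop-psi-sym}.

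Combining these, I would repeat, event by event over $\cE_1,\dots,\cE_6$, the same gradient decomposition carried out in \Cref{lem-s22-gd1-non} (using \Cref{lem-grad-decompositions-sym} and \Cref{lem-s22-lambda-1-sym}--\Cref{lem-s22-lambda-6-sym}), but swapping the roles of $(g_2,y_1)$ and $(g_1,y_0)$ and picking up the extra $1_{\tau(x_0)=s'}$. Since the main event $\cE_1$ dominates the diagonal gradient with lower bound $\Omega(B/d)\cdot\E[(1-\logit^{(t)}_{5,j_2})\mid\tau(x_1)=s,\cE_1]$, and the off-diagonal version of the same expectation satisfies the same bound multiplied by the factor $\Theta(1/d)$ from (i), the desired ratio $O(1/d)$ follows. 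For the other events $\cE_m$, $m\ge 2$, the sign-cancellation and logit-control estimates established in \Cref{lem-s22-gd1-non} transfer verbatim, again picking up only the additional $\Theta(1/d)$ factor.

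The main subtlety, and the one I would handle most carefully, is ensuring that no term becomes sign-reversed or loses an order of magnitude when $V_{j,r}(g_2)$ is replaced by $V_{j,r}(g_1)$: specifically, in the $\cE_2$ and $\cE_3$ contributions of $\cN^{(t)}_{s,\cdot,2,\mathrm{II}}$, the cancellations that were crucial in \Cref{lem-s22-gd1-non} relied on identities like $\attn_{\ans,1\to\ans,0}V(y_0)+\attn_{\ans,1\to\pred,1}V(g_1)$ being small; these same identities still apply here, only with the appropriate $V$'s attached, because the learned feature shape is symmetric between $(g_1,y_0)$ and $(g_2,y_1)$. Once this is verified, the resulting bound on $|[-\nabla_{\Q^{(t)}_{4,p}}\Loss_5^{2,2}]_{s,s'}|$ is at most $O(1/d)$ times the lower bound on $|[-\nabla_{\Q^{(t)}_{4,p}}\Loss_5^{2,2}]_{s,s}|$ from \Cref{lem-s22-gd1-non}, completing the proof.
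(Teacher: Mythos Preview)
Your proposal is correct and matches the paper's approach: the paper does not spell out a proof for this lemma, treating it as routine given the analogous bounds in \Cref{lem-s21-grad-2-non} and \Cref{lem-s21-gd5,lem-s22-gd5-cyc}; the essential idea, as you identify, is that the off-diagonal gradient carries the extra indicator $\1_{\tau(x_0)=s'}$, which contributes the $O(1/d)$ factor, while the remaining attention prefactors and feature magnitudes are of the same order as in the diagonal case. Your event-by-event analysis is more detailed than anything the paper writes for this particular lemma, but it is exactly the implicit argument the paper relies on.
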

  \subsubsection{Attention gap is small}
\begin{lemma}\label{lem-s22-attention-gap-1-non}
    If \Cref{induction-s22-non} holds for all iterations $t\in [T_{1,2,1,s}, T_{1,2,2,s})$, then for any sample $\Zb^{2,1}$, we have 
   \begin{align*}
     \attn^{(t)}_{{\ans,1} \rightarrow \pred,2} - \attn^{(t)}_{{\ans,1} \rightarrow \ans,1}\leq c_1,
   \end{align*}
   where $c_1>0$ is a small constant.
 \end{lemma}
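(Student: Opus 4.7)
The plan is to mirror the counter-argument used in the Stage 1.2.1 analogue (Lemma \ref{lem-s21-attention-gap-non}), but carried out under the sharper logit regime of Stage 1.2.2. Suppose for contradiction that there is a first iteration $\tilde T \in [T_{1,2,1,s},T_{1,2,2,s})$ at which
\[
\E\bigl[\attn^{(\tilde T)}_{\ans,1\to\pred,2}-\attn^{(\tilde T)}_{\ans,1\to\ans,1}\bigm|\tau(x_1)=s\bigr]\;\geq\;\tfrac{1.005\log d}{2B}.
\]
Since \Cref{induction-s22-non}(b) guarantees $|[\Qb^{(\tilde T)}_{4,p}]_{s,s'}|\leq\tilde O(1/d)$ for $s'\neq s$, the uniform gap bound $\attn^{(\tilde T)}_{\ans,1\to\pred,2}-\attn^{(\tilde T)}_{\ans,1\to\ans,1}\geq \frac{1.005\log d}{2B}$ transfers to every sample $\Zb^{2,1}$ with $\tau(x_1)=s$ up to $\tilde O(1/d)$ error. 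I would then use \Cref{lem-s22-lambda-1-sym} on the dominant event $\cE_1$ (which by Lemma \ref{lem-s22-attn-non} still controls $\attn_{\ans,1\to\pred,2}-\attn_{\ans,1\to\ans,0}\geq \Omega(1)$) to get both $\Lambda^{(\tilde T)}_{5,j_2,r_{g_2\cdot y_1}}$ and $\Lambda^{(\tilde T)}_{5,j'_2,r_{g_2\cdot y_0}}$ firmly inside the linear regime, with difference exactly $2\bigl(\attn_{\ans,1\to\ans,1}-\attn_{\ans,1\to\ans,0}\bigr)B\pm O(\delta)$.

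Next I would compute the logit ratio on $\cE_1$. Because the gap has crossed $\tfrac{1.005\log d}{2B}$, the standard softmax computation yields
\[
\logit^{(\tilde T)}_{5,j'_2}\;=\;\bigl(1-O(d^{-0.0025})\bigr)\bigl(1-\logit^{(\tilde T)}_{5,j_2}\bigr),
\]
matching the calculation in Lemma \ref{lem-s21-attention-gap-non}. Plugging this into the refined gradient formulas from \Cref{lem-grad-decompositions-sym} (the $\cN_{s,3,2,\bullet}$ versus $\cN_{s,4,2,\bullet}$ split), and using \Cref{lem-prop-psi-sym} for the target feature magnitudes, the contributions restricted to $\cE_1$ satisfy
\[
\E\bigl[\cN^{(\tilde T)}_{s,4,2,\mathrm{I}}+\cN^{(\tilde T)}_{s,4,2,\mathrm{II}}\bigr]\;-\;\E\bigl[\cN^{(\tilde T)}_{s,3,2,\mathrm{I}}+\cN^{(\tilde T)}_{s,3,2,\mathrm{II}}\bigr]\;\geq\;\Omega\!\Bigl(\tfrac{B}{d}\Bigr)\,\E\bigl[1-\logit^{(\tilde T)}_{5,j_2}\mid \tau(x_1)=s,\cE_1\bigr],
\]
so that $\Qb_{4,4}$ grows strictly faster than $\Qb_{4,3}$ on the dominant event. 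The remaining step is to absorb $\cE_2,\cE_3,\cE_4,\cE_5,\cE_6$ into a lower-order error: using the Lambda characterizations \Cref{lem-s22-lambda-2-sym}--\Cref{lem-s22-lambda-6-sym} together with the Stage-1.2.2 loss bound $\Loss^{2,2}_{5,s}\leq \Theta(e^{(-1/2+3.01c_1)2B})$ forces the logits on the confusing predictions in $\cE_2,\cE_3$ to be bounded by an appropriate exponentially decaying factor (via the probability weights $\Theta(1/n_y)$), and $\cE_4,\cE_5,\cE_6$ contribute only $1/\poly(d)$-scale terms. These error terms are dominated by the $\Omega(B/d)$ main contribution from $\cE_1$, producing a sign-definite inequality $\bigl[-\nabla_{\Qb_{4,4}}\Loss^{2,2}_5\bigr]_{s,s}>\bigl[-\nabla_{\Qb_{4,3}}\Loss^{2,2}_5\bigr]_{s,s}$.

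From this one-step comparison, $[\Qb^{(\tilde T+1)}_{4,4}]_{s,s}$ grows faster than $[\Qb^{(\tilde T+1)}_{4,3}]_{s,s}$, which in expectation forces $\attn_{\ans,1\to\pred,2}-\attn_{\ans,1\to\ans,1}$ to decrease rather than further exceed $\tfrac{1.005\log d}{2B}$, contradicting the definition of $\tilde T$. Choosing $c_1=\tfrac{1.005\log d}{B}$ (consistent with the statement and with \Cref{induction-s22-non}(c)) closes the contradiction. The main obstacle I foresee is the bookkeeping of the non-dominant events, especially $\cE_2$ and $\cE_3$: their probability is only $\Theta(1/n_y)$, yet the relevant logits may be comparable to $1-\logit_{5,j_2}$ on $\cE_1$ because of the special structure of $\Lambda_{5,j'_2,r_{g_2\cdot y_0}}$ on $\cE_2$ and $\Lambda_{5,\tau(g_1(y_1)),r_{g_1\cdot y_1}}$ on $\cE_3$; controlling these carefully through \Cref{induction-s22-non}(c)--(d) (which bounds $\attn_{\ans,1\to\pred,2}-\attn_{\ans,1\to\ans,1}\leq c_1$ and $\attn_{\ans,1\to\ans,1}-\attn_{\ans,1\to\pred,2}\leq \attn_{\ans,1\to\pred,1}-c_2$) is what makes the argument go through without circularity.
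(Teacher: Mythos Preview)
Your proposal is essentially the paper's approach: the paper's own proof simply reads ``follows along the same lines as Lemma~\ref{lem-s21-attention-gap-non},'' and your plan is precisely to rerun that Stage~1.2.1 contradiction argument (first crossing time $\tilde T$, uniform transfer via off-diagonal smallness, linear-regime $\Lambda$'s on $\cE_1$, logit ratio $\logit_{5,j'_2}=(1-o(1))(1-\logit_{5,j_2})$, then the gradient comparison forcing $[\Qb_{4,4}]_{s,s}$ to outpace $[\Qb_{4,3}]_{s,s}$). One small slip: you invoke ``the Stage-1.2.2 loss bound $\Loss^{2,2}_{5,s}\le \Theta(e^{(-1/2+3.01c_1)2B})$'' to control $\cE_2,\cE_3$, but during $[T_{1,2,1,s},T_{1,2,2,s})$ the loss is by definition \emph{above} that threshold, so you cannot use it as an upper bound; fortunately this step is unnecessary---the paper handles the non-dominant events directly via the $\Lambda$ characterizations and the $\Theta(1/n_y)$ probability weights (or simply focuses on $\cE_1$), and your argument goes through once you drop that appeal.
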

 The proof follows along the same lines as \Cref{lem-s21-attention-gap-non}, and hence the details are omitted for brevity.

 \begin{lemma}\label{lem-s22-attention-gap-2-non}
    If \Cref{induction-s22-non} holds for all iterations $t\in [T_{1,2,1,s}, T_{1,2,2,s})$, then for any sample $\Zb^{2,1}$, we have 
   \begin{align*}
    \attn^{(t)}_{{\ans,1} \rightarrow \ans,1}-  \attn^{(t)}_{{\ans,1} \rightarrow \pred,2} \leq \attn^{(t)}_{{\ans,1} \rightarrow \pred,1}-c_2,
   \end{align*}
   where $c_2=\frac{\log d }{4B}>0$ is a small constant.
 \end{lemma}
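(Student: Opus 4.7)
The plan is to prove this attention-gap bound by contradiction, mirroring the strategy used in \Cref{lem-s21-attention-gap-non} and \Cref{lem-s22-attention-gap-1-non}. Let $\tilde{T}$ denote the first iteration at which the stated inequality fails, i.e., $\E\bigl[\attn^{(\tilde{T})}_{\ans,1\to\pred,2}+\attn^{(\tilde{T})}_{\ans,1\to\pred,1}-\attn^{(\tilde{T})}_{\ans,1\to\ans,1}\mid \tau(x_1)=s\bigr]<c_2=\tfrac{\log d}{4B}$. By the $\tilde{O}(1/d)$-scale off-diagonals of $\Qb_{4,3},\Qb_{4,4}$ from \Cref{induction-s22-non}(b), the same bound transfers to every individual $\Zb^{2,1}$ up to an $\tilde{O}(1/d)$ additive perturbation. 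The goal is then to show that at $\tilde{T}$ the gradient on $\Qb_{4,3}$ strictly dominates the gradient on $\Qb_{4,4}$, so that $\attn_{\ans,1\to\pred,2}$ grows faster than $\attn_{\ans,1\to\ans,1}$ and forces the attention combination back into the allowed region.

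First, I would import the $\Lambda$-expressions from \Cref{lem-s22-lambda-1-sym,lem-s22-lambda-2-sym,lem-s22-lambda-3-sym,lem-s22-lambda-4-sym,lem-s22-lambda-5-sym,lem-s22-lambda-6-sym} and evaluate them under the critical constraint $\attn_{\ans,1\to\pred,2}+\attn_{\ans,1\to\pred,1}-\attn_{\ans,1\to\ans,1}\le c_2+\tilde{O}(1/d)$. The key observation is that on $\cE_2$ the incorrect-class activation satisfies $\Lambda^{(\tilde{T})}_{5,j'_2,r_{g_2\cdot y_0}}\le 2Bc_2+O(\delta)=\tfrac{1}{2}\log d+o(1)$, so $\logit^{(\tilde{T})}_{5,j'_2}\1_{\cE_2}=O(d^{-1/2})$ and the $\cE_2$-contribution to either $\nabla_{\Qb_{4,3}}$ or $\nabla_{\Qb_{4,4}}$ is bounded by $\tilde{O}(B/(d^{3/2}n_y))$. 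On the dominant event $\cE_1$, \Cref{lem-s22-attn-non}(1)--(2) imply $\attn_{\ans,1\to\pred,2}-\attn_{\ans,1\to\ans,0}\ge \Omega(1)$ while $\attn_{\ans,1\to\ans,1}-\attn_{\ans,1\to\pred,1}\leq \attn_{\ans,1\to\ans,1}-\attn_{\ans,1\to\ans,0}\pm\tilde{O}(1/d)\le 0.2-\Omega(1)$, keeping $\Lambda^{(\tilde{T})}_{5,j_2,r_{g_2\cdot y_1}}$ solidly in the linear regime.

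Second, I would compute $\bigl[-\nabla_{\Qb^{(\tilde{T})}_{4,3}}\Loss^{2,2}_5\bigr]_{s,s}-\bigl[-\nabla_{\Qb^{(\tilde{T})}_{4,4}}\Loss^{2,2}_5\bigr]_{s,s}$ term-by-term using \Cref{lem-grad-decompositions-sym}. On $\cE_1$ the piece $\cN^{(\tilde{T})}_{s,3,2,\rom1}-\cN^{(\tilde{T})}_{s,4,2,\rom1}$ is proportional to $(1-\logit^{(\tilde{T})}_{5,j_2})\cdot\bigl(V_{j_2,r_{g_2\cdot y_1}}(g_2)-V_{j_2,r_{g_2\cdot y_1}}(y_1)\bigr)$ weighted by $\attn_{\ans,1\to\pred,2}\asymp\attn_{\ans,1\to\ans,1}$. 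By the symmetry relation $V_{j,r}(g)=C_\alpha V_{j,r}(y)\pm \tilde{O}(\varpi^{1/(q-1)})$ with $C_\alpha=\Theta(n_y)$ from \Cref{thm:learning-symmetric-actions}, this difference is positive and of magnitude $\Omega(B)$, yielding a net $\Omega(B/d)\cdot\E[(1-\logit^{(\tilde{T})}_{5,j_2})\mid \tau(x_1)=s,\cE_1]$ surplus in favor of $\Qb_{4,3}$. The analogous differences on $\cE_3$ and $\cE_4$ can only contribute non-negatively (in the former because $\Lambda^{(\tilde{T})}_{5,\tau(g_1(y_1)),r_{g_1\cdot y_1}}$ is read from the $\pred,1$ slot while the correct $j_2$ is read from $\pred,2$, both with positive $V(g)$; in the latter by the same cancellation of $V(y_0)$ and $V(g_1)$ that was already used in \Cref{lem-s21-gd1-non}), while $\cE_5\cup\cE_6$ contribute at most $\tilde{O}(1/\poly d)$ times the $\cE_1$ surplus by the definition of $T_{1,2,2,s}$.

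Summing these estimates gives $\bigl[-\nabla_{\Qb^{(\tilde{T})}_{4,3}}\Loss^{2,2}_5\bigr]_{s,s}-\bigl[-\nabla_{\Qb^{(\tilde{T})}_{4,4}}\Loss^{2,2}_5\bigr]_{s,s}\geq \Omega(B/d)\cdot\E[(1-\logit^{(\tilde{T})}_{5,j_2})\mid \tau(x_1)=s,\cE_1]\gg 0$ at $\tilde{T}$. Combined with \Cref{lem-s22-grad-2-non} to dismiss the off-diagonal gradients, the one-step update strictly increases $[\Qb_{4,3}]_{s,s}-[\Qb_{4,4}]_{s,s}$, hence strictly increases $\attn_{\ans,1\to\pred,2}+\attn_{\ans,1\to\pred,1}-\attn_{\ans,1\to\ans,1}$, contradicting the minimality of $\tilde{T}$. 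The main obstacle I anticipate is the bookkeeping on $\cE_2$: the neuron $r_{g_2\cdot y_0}=r_{g_1\cdot y_0}$ receives superposed contributions from $V_{j'_2,r}(g_1)$ and $V_{j'_2,r}(g_2)$ whose cancellation against $\Lambda^{(\tilde{T})}_{5,j'_2,r_{g_2\cdot y_0}}$ is delicate, and a crude bound on $\cN^{(\tilde{T})}_{s,3,2,\rom2}-\cN^{(\tilde{T})}_{s,4,2,\rom2}$ is not obviously non-negative once $\logit^{(\tilde{T})}_{5,j'_2}$ is only polynomially (rather than exponentially) small. Resolving this requires pinning down the constant $c_2=\tfrac{\log d}{4B}$ so that $\logit^{(\tilde{T})}_{5,j'_2}\1_{\cE_2}\le d^{-1/2}$, which renders the $\cE_2$-imbalance strictly subleading relative to the $\Omega(B/d)$ surplus on $\cE_1$.
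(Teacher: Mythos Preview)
Your proposal follows the same contradiction strategy and event-by-event decomposition as the paper, and you correctly isolate the role of $c_2$ in forcing $\Lambda^{(\tilde T)}_{5,j'_2,r_{g_2\cdot y_0}}\le \tfrac{1}{2}\log d$ on $\cE_2$. The substantive difference is that the paper bounds the two gradients \emph{separately} at $\tilde T$---it shows $[-\nabla_{\Q_{4,3}}\Loss]_{s,s}\ge \Omega(B/d)\,\E[1-\logit_{5,j_2}]$ and $[-\nabla_{\Q_{4,4}}\Loss]_{s,s}\le -\Omega(B/(dn_y))\,\E[1-\logit_{5,j_2}]$---rather than only their difference. With separate signs the last step is immediate, since the attention combination $\tfrac{e^a+1-e^b}{e^a+e^b+2}$ is monotone increasing in $a=[\Q_{4,3}]_{s,s}$ and decreasing in $b=[\Q_{4,4}]_{s,s}$.

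Your ``hence'' step is the one place that needs care: knowing only that $\dot a-\dot b>0$ does \emph{not} by itself force the combination to increase, because increasing $b$ also pushes it down. Concretely, the time derivative of the combination is proportional to $\dot a\,e^a(2e^b+1)-\dot b\,e^b(2e^a+3)$, which reduces to $\dot a>\dot b$ only when $e^a,e^b\gg 1$; at the start of Stage~1.2.2 these are merely $e^{\Omega(1)}$, so the implicit constants matter and the implication is not automatic. The fix is exactly the paper's observation: on $\cE_1$ at $\tilde T$ one has $\attn_{\ans,1\to\pred,2}-\attn_{\ans,1\to\ans,1}<c_2-\attn_{\ans,1\to\pred,1}<0$ (since $\attn_{\ans,1\to\pred,1}\approx\ate/2\ge \tfrac{2}{3}c_1>c_2$), so $\Lambda_{5,j'_2,r_{g_2\cdot y_0}}<0$ is deactivated; then $V_{j_2,r}(y_1)\approx 2B/n_y<\Lambda_{5,j_2,r_{g_2\cdot y_1}}$ makes $\cN_{s,4,2,\rom1}\le 0$ directly, and the $\cN_{s,4,2,\rom2}$ terms are likewise nonpositive, yielding the separate negative bound on $[-\nabla_{\Q_{4,4}}]_{s,s}$. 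With that in hand your argument closes.
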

 \begin{proof}
    Let $\tilde{T}$ denote the first time such that
    \[
       \E\Big[\,\attn^{(t)}_{{\ans,1} \rightarrow \pred,2}  
         +\attn^{(t)}_{{\ans,1} \rightarrow \pred,1} 
         -\attn^{(t)}_{{\ans,1} \rightarrow \ans,1}
         \;\big|\;\tau(x_1)=s\Big]\;\leq\; \frac{\log d}{4.02B}.
    \]
    Since $\big|[\Q_{4,p}^{(t)}]_{s,s'}\big| \leq O\!\left(\tfrac{[\Q_{4,p}^{(t)}]_{s,s}}{d}\right)$ for $p\in\{3,4\}$, it follows that for any sample $\Zb^{2,1}$ with $\tau(x_1)=s$, at time $\tilde{T}$ we have  
    \begin{align*}   
        \attn^{(t)}_{{\ans,1} \rightarrow \pred,2}  
        +\attn^{(t)}_{{\ans,1} \rightarrow \pred,1} 
        -\attn^{(t)}_{{\ans,1} \rightarrow \ans,1}
        \;\geq\; \frac{\log d}{4.02B}\;\pm\;\tilde{O}(1/d).
    \end{align*}
    By \Cref{lem-s22-attn-non}, we also have 
    $\attn^{(\tilde{T})}_{\ans,1\to\pred,2}-\attn^{(\tilde{T})}_{\ans,1\to\ans,0}\geq \Omega(1)\gg \varrho$. 
    We then consider the following cases:
    \begin{itemize}
        \item If $\Zb^{2,1}\in \cE_1$, then $\Lambda^{(\tilde{T})}_{5,j_2,r_{g_2\cdot y_1}}$ is already activated into the linear regime. 
        Moreover, since $\attn^{(\tilde{T})}_{\ans,1\to\pred,2}-\attn^{(\tilde{T})}_{\ans,1\to\ans,1}\leq \frac{\log d}{4.02B}-\attn^{(t)}_{{\ans,1}\to\pred,1}<0$, 
        it follows that $\Lambda^{(\tilde{T})}_{5,j'_2,r_{g_2\cdot y_0}}$ cannot be activated. 
        Thus, by \Cref{lem-grad-decompositions-sym},
        \[
          \E\Big[\big(\cN^{(\tilde{T})}_{s,3,2,\rom1}+\cN^{(\tilde{T})}_{s,3,2,\rom2}+\cN^{(\tilde{T})}_{s,3,2,\rom3}\big)\1_{\cE_1}\Big]
          \;\geq\; \Omega\!\left(\tfrac{B}{d}\right)\cdot \E\big[1-\logit_{5,j_2}^{(\tilde{T})}\;\big|\;\tau(x_1)=s,\, \cE_1\big],
        \]
        while
        \[
          \E\Big[\big(\cN^{(\tilde{T})}_{s,4,2,\rom1}+\cN^{(\tilde{T})}_{s,4,2,\rom2}+\cN^{(\tilde{T})}_{s,4,2,\rom3}\big)\1_{\cE_1}\Big]
          \;\leq\; -\Omega\!\left(\tfrac{B}{d n_y}\right)\cdot \E\big[1-\logit_{5,j_2}^{(\tilde{T})}\;\big|\;\tau(x_1)=s,\, \cE_1\big].
        \]
 
        \item If $\Zb^{2,1}\in \cE_2$, then
        \[
          \Lambda^{(\tilde{T})}_{5,j'_2,r_{g_2\cdot y_0}}
          =(\attn^{(\tilde{T})}_{\ans,1\to\pred,2}+\attn^{(\tilde{T})}_{\ans,1\to\pred,1}
            -\attn^{(\tilde{T})}_{\ans,1\to\ans,0})2B
          \;\leq\;\frac{\log d}{2.01}.
        \]
        Hence $\logit^{(1)}_{5,j_2}\leq O(d^{-1.01/2.01})(1-\logit_{5,j_2}^{(t)})$. 
        By \Cref{lem-grad-decompositions-sym},
        \[
          \E\Big[\big(\cN^{(\tilde{T})}_{s,3,2,\rom1}+\cN^{(\tilde{T})}_{s,3,2,\rom2}+\cN^{(\tilde{T})}_{s,3,2,\rom3}\big)\1_{\cE_2}\Big]
          \;\geq\;\Omega\!\left(\tfrac{B}{d n_y}\right)\cdot \E\big[1-\logit_{5,j_2}^{(\tilde{T})}\;\big|\;\tau(x_1)=s,\, \cE_2\big],
        \]
        and
        \[
          \E\Big[\big(\cN^{(\tilde{T})}_{s,4,2,\rom1}+\cN^{(\tilde{T})}_{s,4,2,\rom2}+\cN^{(\tilde{T})}_{s,4,2,\rom3}\big)\1_{\cE_2}\Big]
          \;\leq\; -\Omega\!\left(\tfrac{B}{d n_y}\right)\cdot \E\big[1-\logit_{5,j_2}^{(\tilde{T})}\;\big|\;\tau(x_1)=s,\, \cE_2\big].
        \]
 
        \item If $\Zb^{2,1}\in \cE_3$, we can apply the crude bounds
        \[
          \E\Big[\big(\cN^{(\tilde{T})}_{s,3,2,\rom1}+\cN^{(\tilde{T})}_{s,3,2,\rom2}+\cN^{(\tilde{T})}_{s,3,2,\rom3}\big)\1_{\cE_3}\Big]\;\geq\; 0,
        \]
        and
        \[
          \E\Big[\big(\cN^{(\tilde{T})}_{s,4,2,\rom1}+\cN^{(\tilde{T})}_{s,4,2,\rom2}+\cN^{(\tilde{T})}_{s,4,2,\rom3}\big)\1_{\cE_3}\Big]\;\leq\; 0.
        \]
    \end{itemize}
 
    Combining the above, and noting that the gradient contribution from $\cE_{4}\cup\cE_5\cup\cE_6$ is negligible, we conclude
    \begin{align}
        \big[-\nabla_{\Q^{(\tilde{T})}_{4,3}}\Loss_{5}^{2,2}\big]_{s,s} 
        &\;\geq\; \Omega\!\left(\tfrac{B}{d}\right)\cdot \E\big[1-\logit_{5,j_2}^{(\tilde{T})}\;\big|\;\tau(x_1)=s,\, \cE_1\big] \label{s22-eq-grad-gap-3-non}\\
        &\quad +\Omega\!\left(\tfrac{B}{d n_y}\right)\cdot \E\big[1-\logit_{5,j_2}^{(\tilde{T})}\;\big|\;\tau(x_1)=s,\, \cE_2\big], \notag\\
        \big[-\nabla_{\Q^{(\tilde{T})}_{4,4}}\Loss_{5}^{2,2}\big]_{s,s} 
        &\;\leq\; -\Omega\!\left(\tfrac{B}{d}\right)\cdot \E\big[1-\logit_{5,j_2}^{(\tilde{T})}\;\big|\;\tau(x_1)=s,\, \cE_1\big] \label{s22-eq-grad-gap-4-non}\\
        &\quad -\Omega\!\left(\tfrac{B}{d n_y}\right)\cdot \E\big[1-\logit_{5,j_2}^{(\tilde{T})}\;\big|\;\tau(x_1)=s,\, \cE_2\big]. \notag
    \end{align}
 
    Finally, observe that
    \[
       \attn^{(t)}_{{\ans,1} \rightarrow \pred,2}  
       +\attn^{(t)}_{{\ans,1} \rightarrow \pred,1} 
       -\attn^{(t)}_{{\ans,1} \rightarrow \ans,1}
       \;=\; \frac{e^{[\Q^{(t)}_{4,3}]_{s,s}}-e^{[\Q^{(t)}_{4,4}]_{s,s}}-e^{\tilde{O}(1/d)}}
         {e^{[\Q^{(t)}_{4,3}]_{s,s}}+e^{[\Q^{(t)}_{4,4}]_{s,s}}+e^{\tilde{O}(1/d)}}.
    \]
    Hence, \eqref{s22-eq-grad-gap-3-non} and \eqref{s22-eq-grad-gap-4-non} together imply that 
    \[
      \attn^{(t)}_{{\ans,1} \rightarrow \pred,2}  
      +\attn^{(t)}_{{\ans,1} \rightarrow \pred,1} 
      -\attn^{(t)}_{{\ans,1} \rightarrow \ans,1}
    \]
    must decrease at time $\tilde{T}+1$.
 \end{proof}

\subsubsection{At the End of Training}
\begin{lemma}[At the end of stage 1.2.2]\label{lem-end-sym} \Cref{induction-s22-non} holds for all iterations $T_{1,2,1,s}<t\leq T_{1,2,2,s}={O}\Big(\frac{\poly(d)}{\eta B}\Big)$. At the end of training, we have
    \begin{enumerate}[(a)]
        \item Attention concentration: $\E[\ea\mid \tau(x_1)=s]\leq 2.52c_1$ for some small constant $0<c_1=\frac{1.005\log d }{2B}$;
        \item Loss convergence: $\Loss^{2,2}_{5,s}\leq e^{(-\frac{1}{2}+{3.01 c_1})\cdot 2B}=\frac{1}{\poly d}$.
    \end{enumerate}
\end{lemma}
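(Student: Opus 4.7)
The plan is three-phase: (i) verify that \Cref{induction-s22-non} is preserved throughout $t\in[T_{1,2,1,s},T_{1,2,2,s}]$; (ii) bound the stopping time $T_{1,2,2,s}=O(\poly(d)/(\eta B))$; and (iii) at $t=T_{1,2,2,s}$, convert the loss threshold into the attention concentration bound $\E[\ea\mid\tau(x_1)=s]\leq 2.52\,c_1$. The loss convergence (b) is then immediate from the definition of $T_{1,2,2,s}$.

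For Phase~(i), the four induction conditions follow by forward induction on $t$. The base case at $t=T_{1,2,1,s}$ is supplied by \Cref{lem-s21-end-non}. For the inductive step: (a) follows from strict positivity of the gradient sum in \Cref{lem-s22-gd1-non} (with a symmetric argument for the complementary regime $\attn^{(t)}_{\ans,1\to\pred,2}>1/2$ via direct growth of $[\Qb_{4,4}]_{s,s}$); (b) follows from \Cref{lem-s22-grad-2-non}, which keeps off-diagonal updates a factor $O(1/d)$ smaller than on-diagonal ones and so preserves the $\tilde{O}(1/d)$ scale; and (c), (d) are exactly the content of \Cref{lem-s22-attention-gap-1-non} and \Cref{lem-s22-attention-gap-2-non}.

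For Phase~(ii), I plug the gradient lower bound
\[
\big[-\nabla_{\Qb_{4,3}}\Loss_5^{2,2}\big]_{s,s}+\big[-\nabla_{\Qb_{4,4}}\Loss_5^{2,2}\big]_{s,s}\ \gtrsim\ \tfrac{B}{d}\,\E\big[1-\logit^{(t)}_{5,j_2}\,\big|\,\tau(x_1)=s,\cE_1\big]
\]
from \Cref{lem-s22-gd1-non} into the loss dynamics: since $1-\logit^{(t)}_{5,j_2}$ on the dominating event $\cE_1$ is comparable to the per-sample loss, this yields a strictly positive lower bound on the per-iteration loss decrement whenever the loss is still above the stopping threshold, which gives $T_{1,2,2,s}-T_{1,2,1,s}=O(\poly(d)/(\eta B))$. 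The complementary regime $\attn^{(t)}_{\ans,1\to\pred,2}>1/2$ is handled by the fact that $[\Qb_{4,3}+\Qb_{4,4}]_{s,s}$ is then already $\Omega(\log d)$ and the logits collapse onto $j_2$ directly.

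For Phase~(iii), the main obstacle, I would recycle the per-event estimates of $-\log p_F$ worked out inside the proof of \Cref{lem-s22-attn-non}. Writing $c:=\attn^{(t)}_{\ans,1\to\pred,1}\approx\attn^{(t)}_{\ans,1\to\ans,0}$ (equal up to $\tilde{O}(1/d)$ by \Cref{induction-s22-non}(b)), we have $\ea=2c\pm\tilde{O}(1/d)$. On each of $\cE_1,\cE_2,\cE_3$ and each sub-case $\attn^{(t)}_{\ans,1\to\pred,2}\geq\attn^{(t)}_{\ans,1\to\ans,1}$ or $\attn^{(t)}_{\ans,1\to\pred,2}<\attn^{(t)}_{\ans,1\to\ans,1}$ (controlled by conditions (c) and (d)), the dominant incorrect-class contribution to $-\log p_F$ admits an upper bound of the form $e^{-2(1/2-\alpha c-\beta c_1-\gamma c_2)B}$ with explicit small constants $\alpha,\beta,\gamma$. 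Weighting by $\Pr(\cE_m)$ and matching the aggregate against the stopping threshold $\Theta(e^{(-1/2+3.01c_1)\cdot 2B})$---which the loss hits from above at $t=T_{1,2,2,s}$---pins $c\leq 1.26\,c_1$, i.e., $\ea\leq 2.52\,c_1$. The main technical subtlety is that the $\cE_2$ contribution, despite carrying the small weight $\Theta(1/n_y)$, actually dominates the aggregate and drives the precise constant $2.52$; moreover, the sub-case $\attn_{\pred,2}<\attn_{\ans,1}$ requires careful use of condition (d), where $c_2=\log d/(4B)$ enters nontrivially, to avoid a looser constant.
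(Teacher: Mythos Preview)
Your Phases (i) and (ii) are fine and match the paper's structure. Phase (iii), however, has the direction of the key inequality backwards, and this is a genuine gap.

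You propose to recycle the per-event \emph{upper} bounds on $-\log p_F$ from the proof of \Cref{lem-s22-attn-non}. But upper bounds on the loss, matched against the stopping threshold (which the loss hits from above), yield only a \emph{lower} bound on $c$---indeed, that is exactly how \Cref{lem-s22-attn-non} uses them to conclude $c\geq \tfrac{2}{3}c_1$. To extract an \emph{upper} bound on $c$ (equivalently on $\ea$) at $t=T_{1,2,2,s}$, you need the reverse: a \emph{lower} bound on $\Loss^{2,2}_{5,s}$ as a function of $c$, together with the fact that the loss has just dropped to the threshold.

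The paper does this by restricting to the main event $\cE_1$ (probability $1-O(1/n_y)$) and lower-bounding the per-sample loss there via the contribution of the $d-O(n_y)$ non-activated classes: on $\cE_1$,
\[
-\log p_F(\Zb_{\ans,2,5}\mid \Zb^{2,1})
\;\geq\;\Theta(1)\cdot e^{\log d-\Lambda^{(t)}_{5,j_2,r_{g_2\cdot y_1}}}
\;\geq\;\Theta\!\Big(e^{-(\tfrac{1}{2}+\tfrac{c_1}{2}-2c-\tfrac{\log d}{B})\,2B}\Big),
\]
using $\attn^{(t)}_{\ans,1\to\pred,2}\leq \tfrac{1+c_1}{2}-c$ from condition (c). Equating this to the threshold $e^{(-\tfrac{1}{2}+3.01c_1)\cdot 2B}$ gives $c\leq 1.26\,c_1$ directly. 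In particular, your claim that ``the $\cE_2$ contribution\ldots actually dominates the aggregate and drives the precise constant $2.52$'' is incorrect: $\cE_2$ dominates only the \emph{upper} bound on the loss (the direction you do not need here), and the constant comes entirely from $\cE_1$. Condition (d) and $c_2$ play no role in this step either.
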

\begin{proof}
    \Cref{lem-s22-attn-non} and \Cref{lem-s22-gd1-non} guarantee the continual growth of $[\Qb^{(t)}_{4,3}]$ and $[\Qb^{(t)}_{4,4}]$ until the 
    attention weight $\ate$ reaches $\tfrac{4c_1}{3}$.  
    Combining \Cref{lem-s22-gd1-non}, \Cref{lem-s22-grad-2-non}, \Cref{lem-s22-attention-gap-1-non}, and \Cref{lem-s22-attention-gap-2-non}, we conclude that there exists a stopping time 
    \[
    T_{1,2,2,s}=O\!\left(\frac{\poly(d)}{\eta B}\right),
    \]
    and that \Cref{induction-s22-non} holds for all $t\in [T_{1,2,1,s},\, T_{1,2,2,s})$.
    
    Next,  we establish an upper bound for $\ate$ at the end of training.  
    In this stage, it suffices to focus on the main event $\cE_1$.  
    We denote
    \[
    \attn^{(t)}_{\ans,1\to \pred,1}, \quad \attn^{(t)}_{\ans,1\to \ans,1} \;=\; c+\tilde{O}\!\left(\tfrac{1}{d}\right).
    \]
    
    \begin{itemize}
        \item If $\attn^{(t)}_{\ans,1\to \pred,2}\geq \attn^{(t)}_{\ans,1\to \ans,1}$, then for $\Zb^{2,1}\in \cE_{1}$ we have
        \begin{align*}
            \log p_{F}(\Z_{\ans,2,5}|\Z^{(2,1)})
            &\geq \Theta(1)\cdot e^{\log d - \big(\attn^{(t)}_{\ans,1\to \pred,2}-\attn^{(t)}_{\ans,1\to \ans,0}\big)2B}\\
            &\geq \Theta\!\left( e^{-\left(\tfrac{1}{2}+\tfrac{c_1}{2}-2c-\tfrac{\log d}{B}\right)2B} \right),
        \end{align*}
        where the last inequality follows from 
        \[
        \attn^{(t)}_{\ans,1\to \pred,2}\leq \tfrac{1}{2}(1-2c+c_1)=\tfrac{1+c_1}{2}-c.
        \]
    
        \item If $\attn^{(t)}_{\ans,1\to \pred,2}\leq \attn^{(t)}_{\ans,1\to \ans,1}$, then for $\Zb^{2,1}\in \cE_{1}$ we have
        \begin{align*}
            \log p_{F}(\Z_{\ans,2,5}|\Z^{(2,1)})
            &\geq \Theta(1)\cdot e^{\log d - \big(\attn^{(t)}_{\ans,1\to \pred,2}-\attn^{(t)}_{\ans,1\to \ans,0}\big)2B}\\
            &\geq \Theta\!\left( e^{-\left(\tfrac{1}{2}-2c-\tfrac{\log d}{B}\right)2B} \right),
        \end{align*}
        where the last inequality follows from the fact that 
        $\attn^{(t)}_{\ans,1\to \pred,2}\leq \tfrac{1}{2}-c$.
    \end{itemize}
    
    Putting the above cases together, we obtain
    \[
    \tfrac{1}{2}+\frac{c_1}{2}-2c-\tfrac{\log d}{B}\;\geq\;\tfrac{1}{2}-3.01c_1,
    \]
    which implies 
    \[
    c \;\leq\; \tfrac{1}{2}(3.02c_1+0.5c_1-c_1)\;=\;1.26c_1.
    \]
    \end{proof}

\section{Recursive Learning the Attention Layer: Symmetry Case}
\label{sec:recursive-learning}
In this section, we analyze the recursive learning dynamics of the attention layer in the symmetry case. 
To this end, we recall the definitions of the greedy data annotator, the bootstrapped LEGO data distribution, and the associated self-training loss.

\begin{definition}[Greedy data annotator]\label{def:greedy-annotator-appendix}
    The greedy language model \(\widehat{p}_F\) induced by a network \(F\) is defined as
    \begin{equation}
        \widehat{p}_{F}(Z_{\ans,L'+1}\mid Z^{L,L'}) = 
        \begin{cases}
            1, & \text{if } Z_{\ans,L'+1} = \argmax_{Z} p_F(Z\mid Z^{L,L'}), \\
            0, & \text{otherwise.}
        \end{cases}
    \end{equation} 
\end{definition}

\begin{definition}[Bootstrapped LEGO distribution]\label{def:bootstrap-lego-distribution-appendix}
    We define \(\cD_{F}^{L,L'}\) as the LEGO distribution in \Cref{assump:lego-data-distribution}, 
    except that the answers \(Z_{\ans,\ell}, 1\leq \ell \leq L'\) are generated recursively by sampling 
    \(Z_{\ans,\ell} \sim \widehat{p}_{F}(\cdot\mid Z^{L,\ell-1})\) from the greedy language model \(\widehat{p}_F\).
\end{definition}

\begin{definition}[Self-training loss]\label{def:self-training-loss-appendix}
    Given a (fixed) model $\tilde{F}$ and length $L$, 
    the self-training next-clause-prediction loss is defined by replacing \(\cD^{L,L'}\) in \Cref{def:next-clause-loss} 
    with the bootstrapped distribution \(\cD_{F}^{L,L'}\) from \Cref{def:bootstrap-lego-distribution-appendix}:
    \begin{align}
        \Loss^{L,L'}_{\tilde{F}}(F)
        \triangleq \E_{Z^{L,L'}\sim \cD_{\tilde{F}}^{L,L'}}
        \Big[ -\log p_{F}(Z_{\ans,L',i} \mid Z^{L,L'-1}) \Big]. 
        \label{eq-self-loss-appendix}    
    \end{align}
    Similarly, the per-token loss is defined by 
    \[
        \Loss^{L,L'}_{\tilde{F}, i} 
        = \E_{Z^{L,L'}\sim \cD_{\tilde{F}}^{L,L'}}\big[-\log p_{F_i}(Z_{\ans,L',i} \mid Z^{L,L'-1})\big], 
        \quad i\in[5].
    \]
\end{definition}

At stage $k \geq 2$, let $F^{(T_{k-1})}$ denote the model obtained from the previous stage, 
trained on the task $\cT^{L_{k-1}}$ with $L_{k-1} = 2^{k-1}$. 
Using the greedy annotator $\widehat{p}_{F^{(T_{k-1})}}$, 
we construct the bootstrapped LEGO distribution $\cD_{F^{(T_{k-1})}}^{L_k,L_k}$ 
with total length $L_k = 2^k$. 
The corresponding self-training loss $\Loss^{L_k,L'}_{F^{(T_{k-1})}}$ is then defined 
as in \Cref{def:self-training-loss-appendix}. 
In this stage, we focus on training via gradient descent on $\Loss^{L_k,L'}_{F^{(T_{k-1})},i}$ 
with sequence length $L'=2$ and target token $i=5$, 
initialized from the previous-stage model $F^{(T_{k-1})}$.

For notational simplicity, throughout the discussion we drop the subscript of the expectation operator $\E$ when $\Zb$ is sampled from the ground-truth LEGO distribution; the subscript will be explicitly included only when the expectation is taken over the bootstrapped distribution. Moreover, we abbreviate the wrong attention error $\ate^{L_k,2}$ as $\ate^{L_k}$ and the attention gap $\Delta^{L_k,2}$ as $\Delta^{L_k}$ when the context is clear.

\subsection{Preliminaries}
We first present preliminaries on the recursive learning attention layer, including its gradient computations and some useful probability lemmas.

\subsubsection{Gradient Computations}
\begin{fact}[Gradients of \(\Q\)]\label{fact:gradients-Q-sym-recursive}
Given $F^{(T_{k-1})}$ 
from the previous stage $\cT^{L_{k-1}}$ with $L_{k-1}=2^{k-1}$ for $k\geq 2$,   for $(p,q)\in \Big\{(4,3), (4,4)\Big\}$, we have 
    \begin{align*}
        -\nabla_{\Q_{p,q}}\Loss^{L_k,2}_{F^{(T_{k-1})},5}= -\nabla_{\Q_{p,q}}\Loss^{L_k,2}_{5}.
    \end{align*}
\end{fact}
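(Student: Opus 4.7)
The plan is to show that the self-training loss gradient and the ground-truth loss gradient differ only by a negligible $1/\poly(d)$ term, which the authors absorb into the ``$=$'' notation. The two losses have the form
\[
    \Loss^{L_k,2}_{F^{(T_{k-1})},5}(F)
    = \E_{Z^{L_k,2}\sim \cD^{L_k,2}_{F^{(T_{k-1})}}}\!\big[-\log p_{F_5}(Z_{\ans,2,5}\mid Z^{L_k,1})\big],
\]
and the analogous expression under $\cD^{L_k,2}$. Both the target $Z_{\ans,2,5}$ and the conditioning prefix $Z^{L_k,1}$ depend only on the answer clauses $Z_{\ans,0}, Z_{\ans,1}, Z_{\ans,2}$, so the only difference between the two expectations lies in how these three clauses are sampled.

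First, I would isolate the discrepancy. Under $\cD^{L_k,2}$ the answer clauses are the ground-truth values $y_\ell = g_\ell(y_{\ell-1})$, whereas under $\cD^{L_k,2}_{F^{(T_{k-1})}}$ they are greedy samples from $\widehat p_{F^{(T_{k-1})}}(\cdot\mid Z^{L_k,\ell-1})$ for $\ell=1,2$. By the inductive guarantee from Theorem~\ref{thm:length-gen-self-training} applied at stage $k-1$, the model $F^{(T_{k-1})}$ solves $\cT^{L_k}$ with accuracy $1-1/\poly(d)$; equivalently, for each $\ell\in\{1,2\}$ the greedy prediction coincides with $y_\ell$ on an event $\cA_\ell$ of probability $1-1/\poly(d)$. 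A union bound gives an event $\cA = \cA_1 \cap \cA_2$ of probability at least $1-O(1/\poly(d))$ on which $\cD^{L_k,2}_{F^{(T_{k-1})}}$ restricted to $(Z^{L_k,1},Z_{\ans,2,5})$ equals $\cD^{L_k,2}$. Consequently the total-variation distance between the two marginal distributions controlling the loss is at most $1/\poly(d)$.

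Next, I would bound the integrand uniformly. Using the expressions in Fact~\ref{fact:gradients-Q-sym} together with the logit-clipping assumption (Assumption~\ref{assumption:output-bound}), orthonormality of the token embeddings (Definition~\ref{def:token-embedding}), and the attention weights being in $[0,1]$, each term $\Xi^{L_k}_{2,5,\kk}$ is of size $\tilde O(1)$ and the product $\Zb_{\ans,1,p}\Zb_{\kk,q}^\top$ has spectral norm $O(1)$. Hence the integrand defining $-\nabla_{\Q_{p,q}}\Loss^{L_k,2}_5$ has entries bounded by $\polylog(d)$ pointwise. Combining the TV bound with this boundedness yields
\[
\Big\|\nabla_{\Q_{p,q}}\Loss^{L_k,2}_{F^{(T_{k-1})},5} - \nabla_{\Q_{p,q}}\Loss^{L_k,2}_{5}\Big\|_\infty \;\le\; \polylog(d)\cdot \tfrac{1}{\poly(d)} \;=\; \tfrac{1}{\poly(d)},
\]
which is the sense in which the two gradients coincide.

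The main obstacle is verifying the pointwise boundedness of the gradient integrand carefully enough that the residual is genuinely of order $1/\poly(d)$ rather than competing with other error terms in Section~\ref{sec:recursive-learning}. In particular, one must check that the derivative factors $\Ecal_{5,j}$ and $\ReLU'(\Lambda_{5,j,r})$ remain $O(1)$ even on the low-probability ``bad'' event where $\widehat p_{F^{(T_{k-1})}}$ misfires, since there the target $Z_{\ans,2,5}$ could lie on an outlier coordinate; the logit-clipping cap and the fact that $\|\Wb_{5,j,r}\|_\infty$ is $\tilde O(1)$ by the end of Stage~1 (invoking Theorem~\ref{thm:learning-symmetric-actions}) give the required uniform control. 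Once this is established, the stated identity follows up to the negligible $1/\poly(d)$ slack that the analysis tolerates throughout.
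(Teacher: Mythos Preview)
Your approach is correct in spirit but takes a weaker and more indirect route than the paper. You treat the equality as an approximate one, bounding the gap via total variation plus a uniform integrand bound, and you source the high-probability correctness of the greedy annotator from the accuracy guarantee of Theorem~\ref{thm:length-gen-self-training} at stage $k-1$. The paper instead argues the equality is \emph{exact}: the attention-concentration and attention-gap bounds carried over from stage $k-1$ (Lemma~\ref{lem-end-sym} for $k=2$, Lemma~\ref{lem-end-rec} otherwise) are \emph{deterministic} properties of $\Qb^{(T_{k-1})}$---they hold for every input in the support because the variables $x_0,\dots,x_{L_k}$ are distinct and the off-diagonal entries of $\Qb_{4,p}$ are $\tilde O(1/d)$. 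With $\ate^{L_k}\le 2\ate^{L_{k-1}}$ still a small constant and $\Delta^{L_k}$ not increased, the correct logit exceeds every incorrect one by $\Omega(\log d)$ on \emph{every} sample, so the greedy argmax annotator $\widehat p_{F^{(T_{k-1})}}$ always returns the ground-truth answer. The bootstrapped distribution therefore coincides \emph{pointwise} with $\cD^{L_k,2}$, and the two losses are identical functions of the parameters; this is Remark~\ref{remark-fact-1-rec}.

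Your TV-distance argument would also work for the downstream analysis, but it buys you only $1/\poly(d)$ slack where none is needed, and it forces you to control the integrand on the low-probability ``bad'' event, which is extra work. More importantly, invoking Theorem~\ref{thm:length-gen-self-training} (a softmax-sampling accuracy statement) and then converting it via Markov to greedy correctness is circuitous: the same attention bounds that feed into that theorem already give the deterministic argmax statement directly, and that is what the paper uses.
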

\begin{remark}\label{remark-fact-1-rec}
    This fact is straightforward to verify. 
Since the attention error $\ate^{L_{k-1},2}$ has already been controlled at a small constant level in the previous stage for the model $F^{(T_{k-1})}$, 
when transitioning from length $2^{k-1}$ to $2^k$, the incorrect attention produced by $F^{(T_{k-1})}$ cannot increase significantly. 
In particular, we have
\[
   \ate^{L_k} \;\leq\; 2 \ate^{L_{k-1}},
\]
which remains small. 
Moreover, the attention gap 
$\Delta^{L_k} = \lvert \attn_{\ans,1 \to \pred,2} - \attn_{\ans,1 \to \ans,1} \rvert$ 
is non-increasing due to the doubling of irrelevant clauses. 
As a result, the probability assigned by $F^{(T_{k-1})}$ to the correct prediction of $Z_{\ans,2,5}$ given $Z^{L_k,1}$ 
remains significantly larger than that of any incorrect prediction. 
Consequently, under the greedy data annotator, 
the self-training loss $\Loss^{L_k,2}_{F^{(T_{k-1})},5}$ coincides with the original loss $\Loss^{L_k,2}_{5}$ 
on the ground-truth LEGO distribution.
\end{remark}

\begin{lemma}[Gradients of $\Qb_{4,3}$]\label{lem-gradients-Q43-rec}
    Given $F^{(T_{k-1})}$ 
    from the previous stage and  $s\in\tau(\X)$, for the diagonal entry \( [\Q_{4,3}]_{s,s} \) of the block \(\Qb_{4,3}\), we have 
    \begin{align*}
    \Big[-\nabla_{\Q_{4,3}}\Loss^{L_k,2}_{F^{(T_{k-1})},5}\Big]_{s,s}&= \E\Bigg[
    \attn_{{\ans,1} \rightarrow \pred,2} \cdot \bigg(\sum_{j \in[d]} \Ecal_{5,j}(\Zb^{2,1})\sum_{r\in [m]}\ReLU^{\prime}\big(\Lambda_{5, j,r}\big)\cdot  \\
    &~~~~~~~~~~~~~\Big( \dbrack{\Wb_{5,j,r},\Z_{\pred,2}}- \Lambda_{5, j,r}+b_{5,j,r}\Big)\bigg) \1_{s=\tau(x_1)}\Bigg].
\end{align*}
Moreover, for the off-diagonal entries \( [\Q_{4,3}]_{s,s'} \) with \( s \neq s' \), we have
    \begin{align*}
    \Big[-\nabla_{\Q_{4,3}}\Loss^{L_k,2}_{F^{(T_{k-1})},5}\Big]_{s,s'}&= \E\Bigg[
    \attn_{{\ans,1} \rightarrow \pred,1} \cdot \bigg(\sum_{j \in[d]} \Ecal_{5,j}(\Zb^{2,1})\sum_{r\in [m]}\ReLU^{\prime}\big(\Lambda_{5, j,r}\big)\cdot  \\
    &~~~~~\Big( \dbrack{\Wb_{5,j,r},\Z_{\pred,1}}- \Lambda_{5, j,r}+b_{5,j,r}\Big)\bigg) \1_{s=\tau(x_1),s'=\tau(x_0)}\Bigg].
\end{align*}
\end{lemma}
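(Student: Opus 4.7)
The plan is to reduce the self-training gradient to the standard gradient computation established earlier, and then read off the explicit expression from the non-recursive analog.

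First, I would invoke \Cref{fact:gradients-Q-sym-recursive}, which asserts the equality
\[
-\nabla_{\Q_{p,q}}\Loss^{L_k,2}_{F^{(T_{k-1})},5} \;=\; -\nabla_{\Q_{p,q}}\Loss^{L_k,2}_{5}
\]
for $(p,q)\in\{(4,3),(4,4)\}$. This is the genuine reduction: the self-training loss is defined via expectation under the bootstrapped distribution $\cD^{L_k,2}_{F^{(T_{k-1})}}$, but as noted in \Cref{remark-fact-1-rec}, at the start of stage $k$ the greedy annotator $\wh{p}_{F^{(T_{k-1})}}$ already produces the correct labels on the relevant prefix, because the inductive control $\ate^{L_{k-1}} \leq O(1)$ and $\Delta^{L_{k-1}} \leq O(1)$ implies $\ate^{L_k}\leq 2\ate^{L_{k-1}}$ is still a small constant and the probability gap between the correct and incorrect tokens remains $\Omega(1)$ in log-scale. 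Hence the bootstrapped marginal over $Z_{\ans,\ell}$ for $\ell\leq 2$ coincides with the ground-truth LEGO marginal, and the loss (and therefore its gradient) is identical.

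Second, having reduced to $-\nabla_{\Q_{4,3}}\Loss^{L_k,2}_{5}$, I would directly apply \Cref{lem-gradients-Q43-sym}, whose derivation in the $L=2$ case carries over essentially verbatim to arbitrary $L_k$: the gradient of the per-token loss $\Loss^{L_k,2}_5$ with respect to a block $\Qb_{4,3}$ at entry $(s,s)$ only receives contribution from clauses $\kk$ whose query–key alignment is nonzero in that block, and by \Cref{assump-Q-structure} and the $\blank$-token structure of predicate and answer clauses (used in \eqref{eq:Q-role}), these are exactly the predicate clauses $\Zb_{\pred,\ell'}$ with $\tau(x_{\ell'-1}) = s$. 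For the diagonal entry, the softmax derivative isolates the single term $\attn_{\ans,1\to\pred,2}$ against the indicator $\1_{s=\tau(x_1)}$, while the off-diagonal $(s,s')$ entry isolates $\attn_{\ans,1\to\pred,1}$ against the joint indicator $\1_{s=\tau(x_1),s'=\tau(x_0)}$. The inner expression $\sum_{j}\cE_{5,j}\sum_r \ReLU'(\Lambda_{5,j,r})(\langle\Wb_{5,j,r},\Zb_{\pred,\ell'}\rangle - \Lambda_{5,j,r} + b_{5,j,r})$ is then inherited from the standard softmax-attention gradient formula in \Cref{fact:gradients-Q-sym}.

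I anticipate no serious obstacle in this lemma per se; it is essentially a bookkeeping exercise combining the reduction from \Cref{fact:gradients-Q-sym-recursive} with the previously computed gradient formula. The only subtlety worth double-checking is the consistency of the notation across $L=2$ (used in \Cref{lem-gradients-Q43-sym}) and general $L_k$: since we are conditioning on $\Zb^{L_k,1}$ and the token of interest is $Z_{\ans,2,5}$, no additional answer clauses beyond $\Zb_{\ans,0}$ appear in the key set, so the structure of query–key alignments is identical to the $L=2$ case and only the irrelevant predicate clauses $\Zb_{\pred,\ell'}$ for $\ell'\geq 3$ enter the softmax normalizer. These extra clauses merely reshape the attention weights $\attn_{\ans,1\to\pred,2}$ and $\attn_{\ans,1\to\pred,1}$ (which is exactly what dilutes attention as $L_k$ grows), but they do not alter the functional form of the gradient expression, since their contribution to $[\,-\nabla_{\Q_{4,3}}\,]_{s,s}$ vanishes after taking the expectation, owing to the independence of $\{x_{\ell'}\}_{\ell'\geq 2}$ from the indicator $\1_{s=\tau(x_1)}$ and the sampling-without-replacement structure in \Cref{assump:lego-data-distribution}. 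The real work—bounding these gradients and propagating attention concentration to stage $k+1$—will come in the subsequent lemmas analogous to \Cref{lem-s22-gd1-non,lem-s22-attention-gap-1-non,lem-s22-attention-gap-2-non}, not in this statement.
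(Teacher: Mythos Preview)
Your approach is correct and matches the paper's (implicit) treatment: the paper states this lemma without a standalone proof, relying precisely on the reduction in \Cref{fact:gradients-Q-sym-recursive} followed by the computation already carried out in \Cref{lem-gradients-Q43-sym}. One minor imprecision: for the diagonal entry, the extra predicate clauses $\Zb_{\pred,\ell'}$ with $\ell'\geq 3$ contribute nothing not because of independence in expectation but because sampling without replacement forces $\tau(x_{\ell'-1})\neq \tau(x_1)=s$ almost surely, so the indicator $\1_{s=\tau(x_{\ell'-1})}$ is identically zero; for the off-diagonal entry, strictly speaking there is a sum over all $\ell'\neq 2$ with indicators $\1_{s'=\tau(x_{\ell'-1})}$, but at most one such term survives per realization (again by without-replacement), and the paper's stated formula is the $\ell'=1$ representative used in the subsequent $O(1/d)$ bounds.
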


\begin{lemma}[Gradients of $\Qb_{4,4}$]\label{lem-gradients-Q44-rec}
    Given $F^{(T_{k-1})}$ 
    from the previous stage and  $s\in\tau(\X)$, for the diagonal entry \( [\Q_{4,4}]_{s,s} \) of the block \(\Qb_{4,4}\), we have 
    \begin{align*}
    \Big[-\nabla_{\Q_{4,4}}\Loss^{L_k,2}_{F^{(T_{k-1})},5}\Big]_{s,s}&= \E\Bigg[
    \attn_{{\ans,1} \rightarrow \ans,1} \cdot \bigg(\sum_{j \in[d]} \Ecal_{5,j}(\Zb^{2,1})\sum_{r\in [m]}\ReLU^{\prime}\big(\Lambda_{5, j,r}\big)\cdot  \\
    &~~~~~~~~~~~~~\Big( \dbrack{\Wb_{5,j,r},\Z_{\ans,1}}- \Lambda_{5, j,r}+b_{5,j,r}\Big)\bigg) \1_{s=\tau(x_1)}\Bigg].
\end{align*}
Moreover, for the off-diagonal entries \( [\Q_{4,4}]_{s,s'} \) with \( s \neq s' \), we have 
    \begin{align*}
        \Big[-\nabla_{\Q_{4,4}}\Loss^{L_k,2}_{F^{(T_{k-1})},5}\Big]_{s,s'}&= \E\Bigg[
    \attn_{{\ans,1} \rightarrow \ans,0} \cdot \bigg(\sum_{j \in[d]} \Ecal_{5,j}(\Zb^{2,1})\sum_{r\in [m]}\ReLU^{\prime}\big(\Lambda_{5, j,r}\big)\cdot  \\
    &~~~~~\Big( \dbrack{\Wb_{5,j,r},\Z_{\ans,0}}- \Lambda_{5, j,r}+b_{5,j,r}\Big)\bigg) \1_{s=\tau(x_1),s'=\tau(x_0)}\Bigg].
\end{align*}
\end{lemma}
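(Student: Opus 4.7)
The plan is to obtain \Cref{lem-gradients-Q44-rec} by a straightforward two-step reduction: first replacing the self-training loss with the ground-truth loss via \Cref{fact:gradients-Q-sym-recursive}, and then specializing the generic softmax-attention gradient formula of \Cref{fact:gradients-Q-sym} to the block $(4,4)$ under the sparsity pattern of \Cref{assump-Q-structure}. Because $-\nabla_{\Q_{4,4}}\Loss^{L_k,2}_{F^{(T_{k-1})},5} = -\nabla_{\Q_{4,4}}\Loss^{L_k,2}_{5}$ by \Cref{fact:gradients-Q-sym-recursive} (with the caveat discussed in \Cref{remark-fact-1-rec}), the entire derivation is structurally identical to that of \Cref{lem-gradients-Q44-sym} for the $(L=2)$ case; the only change is that the context length becomes $L_k=2^k$ rather than $2$, which merely enlarges the index set $\cI^{L_k,1}$ but does not alter the gradient form at the query position $(\ans,1)$.

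First I would write out $-\nabla_{\Q_{4,4}}\Loss^{L_k,2}_{5}$ using \Cref{fact:gradients-Q-sym} at $(p,q)=(4,4)$ and $\ell=2$, obtaining an outer expectation of $\attn_{\ans,1\to\kk}\cdot(\Xi^{L_k}_{2,5,\kk}-\sum_{\kk'}\attn_{\ans,1\to\kk'}\Xi^{L_k}_{2,5,\kk'})\cdot \Zb_{\ans,1,4}\Zb_{\kk,4}^{\top}$, then note three simplifications. Since $\Zb_{\ans,1,4}=e_{\tau(x_1)}$, the scalar entry $[\,\cdot\,]_{s,s'}$ carries the indicator $\1_{s=\tau(x_1)}$. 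Since the fourth token of a predicate clause is $\blank$ (embedding $\mathbf{0}_d$), the sum over $\kk\in\cI^{L_k,1}$ collapses to the answer clauses $\kk\in\{(\ans,0),(\ans,1)\}$, whose fourth tokens are $e_{\tau(x_0)}$ and $e_{\tau(x_1)}$ respectively. This immediately yields the $\1_{s=\tau(x_1)}$ diagonal term (from $\kk=(\ans,1)$ with weight $\attn_{\ans,1\to\ans,1}$) and the $\1_{s=\tau(x_1),s'=\tau(x_0)}$ off-diagonal term (from $\kk=(\ans,0)$ with weight $\attn_{\ans,1\to\ans,0}$).

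Next I would absorb the centering $\sum_{\kk'}\attn_{\ans,1\to\kk'}\Xi^{L_k}_{2,5,\kk'}$ into each $\Xi$ contribution using the identity $\Lambda_{5,j,r}=\sum_{\kk'}\attn_{\ans,1\to\kk'}\langle\Wb_{5,j,r},\Zb_{\kk'}\rangle + b_{5,j,r}$, so that $\langle\Wb_{5,j,r},\Zb_{\kk}\rangle - \sum_{\kk'}\attn_{\ans,1\to\kk'}\langle\Wb_{5,j,r},\Zb_{\kk'}\rangle = \langle\Wb_{5,j,r},\Zb_{\kk}\rangle - \Lambda_{5,j,r} + b_{5,j,r}$, which reproduces exactly the $\big(\langle\Wb_{5,j,r},\Zb_{\kk}\rangle - \Lambda_{5,j,r}+b_{5,j,r}\big)$ factor appearing in the target expressions. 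After specializing $\kk$ to $(\ans,1)$ for the diagonal case and to $(\ans,0)$ for the off-diagonal case, the claimed formulas drop out verbatim.

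There is essentially no serious obstacle here; the lemma is a mechanical corollary of \Cref{fact:gradients-Q-sym,fact:gradients-Q-sym-recursive} combined with the block structure of $\Qb$ and the $\blank$-token pattern of the clause embeddings. The only point requiring mild care is confirming that \Cref{fact:gradients-Q-sym-recursive} is indeed applicable at every iteration $t$ during stage $k$ (not just at $t=T_{k-1}$), i.e.\ that the greedy annotator $\widehat{p}_{F^{(T_{k-1})}}$ continues to agree with ground truth along the chain of length $L_k$; but this is precisely what \Cref{remark-fact-1-rec} asserts via the bounds $\ate^{L_k}\le 2\ate^{L_{k-1}}$ and monotone nonincrease of $\Delta^{L_k}$ relative to $\Delta^{L_{k-1}}$. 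Once that is invoked, the rest of the proof is a direct transcription of the corresponding derivation in \Cref{lem-gradients-Q44-sym}.
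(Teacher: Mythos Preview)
Your proposal is correct and follows essentially the same approach the paper takes for the analogous lemmas (e.g., \Cref{lem-gradients-Q43} in the simply transitive section, whose proof is the template for all subsequent $\Qb_{4,p}$ gradient computations). The paper does not spell out a separate proof for \Cref{lem-gradients-Q44-rec}, treating it as a direct transcription of \Cref{lem-gradients-Q44-sym} after invoking \Cref{fact:gradients-Q-sym-recursive}; your two-step reduction and the use of the $\Lambda_{5,j,r}$ identity to absorb the softmax centering are exactly the intended derivation.
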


\paragraph{Notations for activated neurons.} 
Recall that 
\begin{align*}
    \fA = \bigcup_{j \in \tau(\Y)} \fA_{j}, 
    \quad \text{where } \fA_j = \{\, r_{j,y} \mid y \in \Y \,\}.
\end{align*}
Given $\Zb^{L,\ell-1}$, let 
\[
    \hat{\cG}(\Zb^{L,\ell-1}) = \bigcup_{\ell'=1}^L \{g_{\ell'}\}
\]
be the collection of all group elements chosen in the predicate clauses, 
and similarly define 
\[
    \hat{\cY} = \bigcup_{\ell'=0}^{\ell-1} \{y_{\ell'}\}.
\]
We then introduce
\[
    \hat{\fA}_{j}(\Zb^{L,\ell-1})
    = \Big\{\, r_{g\cdot y}\;\Big|\; g \in \fiber_{j,y},\;
       g \in \hat{\cG}(\Zb^{L,\ell-1}) \ \vee\  y \in \hat{\cY} \Big\}.
\]
For simplicity, we omit the dependence on \( \Zb^{L,\ell-1} \) in the notation of \( \hat{\fA}_j \) when it is clear from context.  

The above recalls the relevant notations from \Cref{sec:useful-bounds-for-gradients-sym}. 
With these preliminaries in place, we now state the following lemma, which also relies on the feature-magnitude bounds established therein.

\begin{lemma}[Characterizations of Lambda]\label{lem-lambda-char-rec}
    Given $\Zb^{L_k,1}$ with  \( \{\attn_{\ans,1 \to \kk} \}_{\kk\in\cI^{L_k,1}}\), then 
    \begin{enumerate}[(a)]
        \item for \( j \in \tau(\cY) \), for activated neuron $r\in\fA_j$, we have  
        \begin{align*}
            \Lambda_{5,j,r}=\sum_{\ell'=1}^2 \attn_{\ans,1 \to \pred,\ell'}V_{j,r}(g_{\ell'})+  \sum_{\ell'=1}^{L_k} \attn_{\ans,1 \to \ans,\ell'-1} V_{j,r}(y_{\ell'-1})\pm  \tilde{O}(\sigma_0). 
        \end{align*}
         \item for \( j \in \tau(\cY) \), for any non-activated neuron $r\notin\fA_j$ 
         we have
      \begin{align*}
            \Big|\Lambda_{5,j,r}\Big| \leq {O}(\delta).
        \end{align*} 
        \item for \( j \notin \tau(\cY) \),  for any $r\in [m]$,  we have 
        \begin{align*}
            \Big|\Lambda_{5,j,r}\Big|%
            \leq \tilde{O}(\sigma_0).
        \end{align*} 
    \end{enumerate}
\end{lemma}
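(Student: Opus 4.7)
The statement is the natural generalization of Lemma~\ref{lem-lambda-char-sym} from the fixed length $L=2$ to the recursive length $L_k=2^k$, and my plan is to mirror the proof of that lemma while carefully tracking how the additional predicate/answer clauses enter the pre-activation. The starting point is the definition
\[
\Lambda_{5,j,r}(\Zb^{L_k,1})
=\sum_{\kk\in\cI^{L_k,1}}\attn_{\ans,1\to\kk}\cdot \langle \Wb_{5,j,r},\Zb_{\kk}\rangle + b_{5,j,r},
\]
splitting the index set $\cI^{L_k,1}$ into the $L_k$ predicate positions $\{(\pred,\ell')\}_{\ell'\in[L_k]}$ and the $2$ answer positions $\{(\ans,0),(\ans,1)\}$.

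First I would reduce each inner product $\langle \Wb_{5,j,r},\Zb_{\kk}\rangle$ to a single dominant $V$-term. For a predicate clause $Z_{\pred,\ell'}=(x_{\ell'},g_{\ell'},x_{\ell'-1},\blank,\blank)$, I use the 5-block decomposition of $\Wb_{5,j,r}$ together with Lemma~\ref{lem-prop-irrelavant-sym} (which forces all contributions from slots $1,3$ to be $\tilde O(\sigma_0)$, since these correspond to variable tokens rather than the token types $(2,\cG),(5,\cY)$ where the FFN has actually learned features). This leaves the slot-2 contribution $V_{j,r}(g_{\ell'})$ as the only non-negligible term, producing the identity $\langle \Wb_{5,j,r},\Zb_{\pred,\ell'}\rangle=V_{j,r}(g_{\ell'})\pm\tilde O(\sigma_0)$ via \eqref{eq-inner-product-1-sym}. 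Likewise, for each answer clause $Z_{\ans,\ell'-1}=(\blank,\blank,\blank,x_{\ell'-1},y_{\ell'-1})$, only the slot-5 contribution survives, giving $\langle \Wb_{5,j,r},\Zb_{\ans,\ell'-1}\rangle=V_{j,r}(y_{\ell'-1})\pm\tilde O(\sigma_0)$ by \eqref{eq-inner-product-2-sym}. Summing weighted by attention and absorbing $b_{5,j,r}=\sigma_0\log d$ into $\tilde O(\sigma_0)$ yields part (a); crucially, since $\sum_{\kk}\attn_{\ans,1\to\kk}=1$ and each per-clause error is uniformly $\tilde O(\sigma_0)$, the aggregate error remains $\tilde O(\sigma_0)$ regardless of $L_k$.

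For part (b), I would invoke \eqref{attn-init-prop-sym-4}: when $r\notin\fA_j$, every $V_{j,r}(g)$ and $V_{j,r}(y)$ is bounded by $O(\delta)$. Plugging these bounds into the decomposition from part (a) and again using $\sum_{\kk}\attn_{\ans,1\to\kk}=1$ controls $|\Lambda_{5,j,r}|$ by $O(\delta)$. For part (c), when $j\notin\tau(\cY)$, Lemma~\ref{lem-prop-irrelavant-sym} directly forces every inner product $\langle \Wb_{5,j,r},\Zb_{\kk}\rangle$ to be $\tilde O(\sigma_0)$, and the bias contributes only $\sigma_0\log d$, so \eqref{eq-inner-product-3-sym} gives the desired $\tilde O(\sigma_0)$ bound.

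The only genuine concern is making sure the growth from $2$ to $L_k=2^k$ predicate clauses does not inflate the error. Because the feature-magnitude bounds \eqref{attn-init-prop-sym-1}--\eqref{attn-init-prop-sym-4} are clause-independent (they come from the MLP convergence of Stage~1.1 and hold uniformly over all $(g,y)\in\cG\times\cY$), and because the error enters as a convex combination with weights summing to one rather than as an additive sum, no $L_k$-dependence appears. Consequently, this lemma is essentially a mechanical verification step that reuses the machinery of Lemma~\ref{lem-lambda-char-sym}, and I do not expect any new technical obstacle; the lemma then serves as the workhorse for the subsequent logit and gradient analyses in the recursive regime.
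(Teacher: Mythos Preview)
Your proposal is correct and follows essentially the same approach as the paper. The paper does not give an explicit proof for this lemma, but the argument is meant to mirror that of Lemma~\ref{lem-lambda-char} (and its symmetry-group analogue Lemma~\ref{lem-lambda-char-sym}): expand $\Lambda_{5,j,r}$ from its definition, apply the inner-product identities \eqref{eq-inner-product-1-sym}--\eqref{eq-inner-product-3-sym} clause by clause, then invoke \eqref{attn-init-prop-sym-4} for part~(b) and Lemma~\ref{lem-prop-irrelavant-sym} for part~(c). Your observation that the error remains $\tilde O(\sigma_0)$ because the attention weights form a convex combination is the right way to handle the length dependence.

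One remark: the displayed formula in the lemma statement appears to have the summation bounds swapped---for input $\Zb^{L_k,1}$ there are $L_k$ predicate clauses and only two answer clauses, so the predicate sum should run to $L_k$ and the answer sum to $2$. Your description of the index set $\cI^{L_k,1}$ is the correct one, and your argument applies to that structure without modification.
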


\begin{lemma}\label{lem-non-activated-neuron-rec}
    Given $j\in\tau(\cY)$, for $r\in \fA_{j}\setminus\hat{\fA}_{j}$, we have $\ReLU^{\prime}\big(\Lambda_{5, j,r}\big)=0$.
\end{lemma}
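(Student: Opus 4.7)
The plan is to mirror the template of \Cref{lem-non-activated-neuron} in the simply transitive setting, adapting the cancellation argument to the asymmetric feature structure of the symmetric case. I would start from the closed-form expansion in \Cref{lem-lambda-char-rec}(a),
\[
\Lambda_{5,j,r} \;=\; \sum_{\ell'}\attn_{\ans,1\to\pred,\ell'}\,V_{j,r}(g_{\ell'}) \;+\; \sum_{\ell'}\attn_{\ans,1\to\ans,\ell'-1}\,V_{j,r}(y_{\ell'-1}) \;\pm\; \tilde O(\sigma_0),
\]
and unpack the hypothesis $r=r_{j,y}\in\fA_j\setminus\hat{\fA}_j$, which by the definition of $\hat{\fA}_j$ forces $y\notin\hat{\cY}$ and $g_{\ell'}\notin\fiber_{j,y}$ for every $\ell'\in[L_k]$; hence every index appearing in the display is a confounding coordinate for the neuron $r_{j,y}$.

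The second step is to substitute the feature-cancellation bounds of \Cref{lem-prop-psi-sym}: picking any $g\in\fiber_{j,y}$, the inequalities \eqref{attn-init-prop-sym-2} and \eqref{attn-init-prop-sym-3} give
\[
V_{j,r_{j,y}}(g_{\ell'}) \;\le\; -\,V_{j,r_{j,y}}(y) + O(\delta), \qquad V_{j,r_{j,y}}(y_{\ell'-1}) \;\le\; -\,V_{j,r_{j,y}}(g) + O(\delta),
\]
while \eqref{attn-init-prop-sym-1} ensures $V_{j,r_{j,y}}(g) + V_{j,r_{j,y}}(y)\ge 2B - O(\delta)$. Combined with the fact that the two attention sums in the display partition unity, a convex-combination argument yields a bound of the form $\Lambda_{5,j,r_{j,y}} \le -B + O(\delta) + \tilde O(\sigma_0)$, placing $\Lambda$ in the flat region $(-\infty,-B]$ of the modified $\mathbf{sReLU}$ from \Cref{assump:modify-relu}, on which the derivative vanishes identically.

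The main obstacle is the quantitative tightness of the convex-combination step. Unlike the simply transitive case, where the symmetry $V(g)\approx V(y)\approx B$ makes the uniform bound $V(g_{\ell'}),V(y_{\ell'-1})\le -B+O(\delta)$ trivial and yields $\Lambda\le -B$ regardless of how attention is distributed, in the symmetric case the sharp asymmetry $V_{j,r_{j,y}}(g)\approx 2B(1-1/n_Y)$ versus $V_{j,r_{j,y}}(y)\approx 2B/n_Y$ (inherited from the $C_\alpha$-proportionality in \eqref{attn-init-prop-sym-1}) makes the bound $-V_{j,r_{j,y}}(y)\,a_p - V_{j,r_{j,y}}(g)\,a_a$, with $a_p$ and $a_a$ the total predicate- and answer-attention masses, sensitive to the split between $a_p$ and $a_a$. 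To close the inequality at $-B$ strictly (absorbing the $O(\delta)$ slack into the clip), I would invoke the attention-concentration guarantees propagated from stage $k-1$ through \Cref{fact:gradients-Q-sym-recursive} and its surrounding remark: the bound $\epsilon^{L_k,2}_{\mathsf{attn}}\le 2\epsilon^{L_{k-1},2}_{\mathsf{attn}}$ keeps the bulk of the mass on $\attn_{\ans,1\to\pred,2}$ and $\attn_{\ans,1\to\ans,1}$ with small gap $\Delta^{L_k,2}$, while $\attn_{\ans,1\to\ans,0}=\tilde O(1/d)$ is negligible, so $a_a$ stays close enough to $1/2$ for the large-magnitude $-V_{j,r_{j,y}}(g)$ contribution on the answer side to balance the small-magnitude $-V_{j,r_{j,y}}(y)$ on the predicate side and drive $\Lambda$ below the clip threshold.
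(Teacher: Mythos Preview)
Your setup is right---expand via \Cref{lem-lambda-char-rec}(a), note that $r=r_{j,y}\notin\hat{\fA}_j$ forces every $g_{\ell'}\notin\fiber_{j,y}$ and every $y_{\ell'-1}\neq y$, then bound each term via \eqref{attn-init-prop-sym-2}--\eqref{attn-init-prop-sym-3}. But the obstacle you flag is an artifact of aiming for the $-B$ clip of \Cref{assump:modify-relu}, and your attention-concentration fix does not actually close it. Writing $V(g)=C_\alpha V(y)$ with $V(g)+V(y)=2B$, the constraint $-V(y)\,a_p-V(g)\,a_a\le -B$ reduces \emph{exactly} to $a_a\ge 1/2$. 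Yet with $L_k$ predicate clauses versus two answer clauses, the non-target mass $\epsilon^{L_k}_{\mathsf{attn}}$ falls disproportionately on the predicate side, so $a_a\approx\attn_{\ans,1\to\ans,1}+\epsilon^{L_k}_{\mathsf{attn}}/L_k$, which is strictly below $1/2$ whenever $\attn_{\ans,1\to\pred,2}\ge\attn_{\ans,1\to\ans,1}$ and $L_k>2$ (both permitted by \Cref{induction-s22-rec}). You land at $\Lambda\approx-(1-\Theta(\epsilon^{L_k}_{\mathsf{attn}}))\,B>-B$, where the modified $\mathbf{sReLU}$ has derivative $-\varpi\neq 0$; the slack is $\Theta(\epsilon^{L_k}_{\mathsf{attn}})\cdot B$, not the $O(\delta)$ you budgeted.

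The way out---and the paper's implicit argument, following the template of \Cref{lem-non-activated-neuron} (whose proof concludes ``$\ll-\varrho$, which implies $\ReLU'(\Lambda)=0$'')---is to target the $-\varrho$ threshold instead. Then the asymmetry is harmless: each $V_{j,r}(g_{\ell'})\le -V_{j,r}(y)+O(\delta)=-\Theta(B/n_y)$ and each $V_{j,r}(y_{\ell'-1})\le -V_{j,r}(g)+O(\delta)=-\Theta(B)$, so the convex combination is at most $-\Theta(B/n_y)+O(\delta)\ll-\varrho$ for \emph{any} attention allocation. This matches the lemma's unconditional statement and explains why the paper leaves it as an immediate corollary of \Cref{lem-lambda-char-rec} and \Cref{lem-prop-psi-sym}, with no appeal to concentration.
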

We are now ready to derive the gradients of the attention layer, building on \Cref{lem-gradients-Q43-rec,lem-gradients-Q44-rec} and the properties established above.

\begin{lemma}[Refined expression for the gradient of $\Qb_{4,3}$] \label{lem-refined-grad-Q43-rec} Given $F^{(T_{k-1})}$ 
    from the previous stage and  $s\in\tau(\X)$, for the diagonal entry \( [\Q_{4,3}]_{s,s} \) of the block \(\Qb_{4,3}\), letting $j_2=\tau(g_2(y_1))$, we have 
 
  \begin{align*}
    &\Big[-\nabla_{\Q_{4,3}}\Loss^{L_k,2}_{F^{(T_{k-1})},5}\Big]_{s,s}= 
      \E\Bigg[
    \attn_{{\ans,1} \rightarrow \pred,2} \cdot \\
    &~~~~~~~~~~~~~ \bigg( (1-\logit_{5,j_2})\cdot \Big(\sum_{r\in\hat{\fA}_{j_2}}\ReLU^{\prime}(\Lambda_{5,j_2, r
    })\cdot \Big( V_{j_2,  r}(g_2)- \Lambda_{5,j_2, r}\pm\tilde{O}(\sigma_0) \Big)\pm \tilde{O}(\delta^{q}) \Big)\\   
    &~~~~~~~~~~-\sum_{j\neq j_2\in\tau(\cY)}\logit_{5,j} \cdot \Big(\sum_{r\in\hat{\fA}_{j}}\ReLU^{\prime}(\Lambda_{5,j,r
    })\cdot  \Big( V_{j, r}(g_2)- \Lambda_{5,j,r}\pm\tilde{O}(\sigma_0) \Big)\pm \tilde{O}(\delta^{q}) \Big) \\
    &~~~~~~~~~~~~\pm\sum_{j\notin\tau(\cY)}\logit_{5,j}\tilde{O}(\sigma^{q}_0)  \bigg)\1_{\tau(x_1)=s}\Bigg].
\end{align*}
Moreover, for the off-diagonal entries \( [\Q_{4,3}]_{s,s'} \) with \( s \neq s' \), we have
  \begin{align*}
    &\Big[-\nabla_{\Q_{4,3}}\Loss^{L_k,2}_{F^{(T_{k-1})},5}\Big]_{s,s'}=  \E\Bigg[
    \attn_{{\ans,1} \rightarrow \pred,1} \cdot\\
    &~~~~~~~~~~~~~ \bigg( (1-\logit_{5,j_2})\cdot \Big(\sum_{r\in\hat{\fA}_{j_2}}\ReLU^{\prime}(\Lambda_{5,j_2, r
    })\cdot \Big( V_{j_2,  r}(g_1)- \Lambda_{5,j_2, r}\pm\tilde{O}(\sigma_0) \Big)\pm \tilde{O}(\delta^{q}) \Big)\\   
    &~~~~~~~~~~-\sum_{j\neq j_2\in\tau(\cY)}\logit_{5,j} \cdot \Big(\sum_{r\in\hat{\fA}_{j}}\ReLU^{\prime}(\Lambda_{5,j,r
    })\cdot  \Big( V_{j, r}(g_1)- \Lambda_{5,j,r}\pm\tilde{O}(\sigma_0) \Big)\pm \tilde{O}(\delta^{q}) \Big) \\
    &~~~~~~~~~~~~\pm\sum_{j\notin\tau(\cY)}\logit_{5,j}\tilde{O}(\sigma^{q}_0)  \bigg)\1_{\tau(x_1)=s,\tau(x_0)=s'}\Bigg].
\end{align*}
\end{lemma}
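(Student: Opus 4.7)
The plan is to derive the refined expression by starting from the basic gradient formula in \Cref{lem-gradients-Q43-rec} and applying two simplifications. The first and essential step is to invoke \Cref{fact:gradients-Q-sym-recursive}, which asserts that $-\nabla_{\Qb_{p,q}}\Loss^{L_k,2}_{F^{(T_{k-1})},5}=-\nabla_{\Qb_{p,q}}\Loss^{L_k,2}_{5}$. This allows me to replace expectations over the bootstrapped distribution $\cD^{L_k,2}_{F^{(T_{k-1})}}$ with expectations over the ground-truth LEGO distribution $\cD^{L_k,2}$. The justification sketched in \Cref{remark-fact-1-rec} is that the previous stage ensures $\ate^{L_{k-1}}$ is a small constant, so doubling the context length yields $\ate^{L_k}\leq 2\ate^{L_{k-1}}$, which together with a non-increasing attention gap $\Delta^{L_k}$ keeps $\hat p_{F^{(T_{k-1})}}$ aligned with the ground-truth labels for all intermediate answer tokens.

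Once the expectation is over $\cD^{L_k,2}$, I would proceed exactly as in the proof of \Cref{lem-refined-grad-Q43-sym}. Namely, I split the inner sum over $j\in[d]$ into three groups: $j=j_2$, $j\in\tau(\cY)\setminus\{j_2\}$, and $j\notin\tau(\cY)$. For each group, I use the loss-derivative identity $\Ecal_{5,j}(\Zb^{L_k,1})=\mathbb{1}_{j=j_2}-\logit_{5,j}$, which produces the $(1-\logit_{5,j_2})$ factor in the first group and the $-\logit_{5,j}$ factors in the second. For the inner sum over $r\in[m]$, I invoke \Cref{lem-non-activated-neuron-rec} to restrict to $r\in\hat{\fA}_j$ (the remaining neurons contribute activations bounded by $O(\delta)$, giving only a $\tilde O(\delta^q)$ remainder after raising through $\ReLU'$). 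I then expand $\langle\Wb_{5,j,r},\Zb_{\pred,2}\rangle$ using \eqref{eq-inner-product-1-sym} to obtain $V_{j,r}(g_2)\pm\tilde O(\sigma_0)$ for $j\in\tau(\cY)$, while \eqref{eq-inner-product-3-sym} gives $\tilde O(\sigma_0)$ for $j\notin\tau(\cY)$, producing the $\tilde O(\sigma_0^q)$ piece multiplying $\logit_{5,j}$ in the third group. The bias contribution $b_{5,j,r}=\sigma_0\log d$ is absorbed into the $\pm\tilde O(\sigma_0)$ error, and combining everything yields the claimed diagonal formula.

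The off-diagonal case $[-\nabla_{\Qb_{4,3}}\Loss^{L_k,2}_{F^{(T_{k-1})},5}]_{s,s'}$ with $s\neq s'$ is handled by the same procedure, with two bookkeeping changes. First, the indicator becomes $\mathbb{1}_{s=\tau(x_1),\,s'=\tau(x_0)}$, because only the clause $\Zb_{\pred,1}$ simultaneously has token $\tau(x_1)$ in position $4$ of $\Zb_{\ans,1}$ and token $\tau(x_0)=s'$ in position $3$, so the attention weight reduces to $\attn_{\ans,1\to\pred,1}$. Second, the inner product $\langle\Wb_{5,j,r},\Zb_{\pred,1}\rangle$ expands to $V_{j,r}(g_1)\pm\tilde O(\sigma_0)$ by \eqref{eq-inner-product-1-sym}, which is why $g_1$ replaces $g_2$ in the final formula.

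The bookkeeping is routine once \Cref{fact:gradients-Q-sym-recursive} is in hand, so the real obstacle is establishing that fact itself, i.e., verifying that the bootstrapped trajectory is faithful to the ground-truth distribution. This amounts to confirming that even as $L_k$ doubles, the model $F^{(T_{k-1})}$ still assigns the top logit to the correct answer token $Z_{\ans,\ell,5}=g_\ell\!\cdot y_{\ell-1}$ for every $\ell\leq 2$, which requires the margin guaranteed by $\ate^{L_{k-1}}\leq 2.52c_1$ and $\Delta^{L_{k-1}}\leq c_1$ from \Cref{lem-end-sym} to survive the attention dilution caused by doubling the context. Once this step-by-step faithfulness is certified (by an argument parallel to the convergence analysis in the base case), the equality of gradients and the subsequent algebraic decomposition are straightforward.
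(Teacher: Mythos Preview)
Your proposal is correct and follows essentially the same approach as the paper. The paper does not write out an explicit proof for this recursive-case lemma, relying instead on the pattern established in \Cref{lem-refined-grad-Q43} (simply transitive) and \Cref{lem-refined-grad-Q43-sym} (symmetry, short-length): start from \Cref{lem-gradients-Q43-rec}, split $j\in[d]$ into the three groups, restrict the neuron sum via \Cref{lem-lambda-char-rec} and \Cref{lem-non-activated-neuron-rec}, and expand the inner products using \eqref{eq-inner-product-1-sym}--\eqref{eq-inner-product-3-sym}; your one additional ingredient, invoking \Cref{fact:gradients-Q-sym-recursive} to pass from the bootstrapped to the ground-truth expectation, is exactly how the paper sets up this section, and the paper treats that fact as already established (via \Cref{remark-fact-1-rec}) rather than as an obstacle to overcome within this lemma.
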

\begin{lemma}[Refined expression for the gradient of $\Qb_{4,4}$] \label{lem-refined-grad-Q44-rec} Given $F^{(T_{k-1})}$ 
    from the previous stage and  $s\in\tau(\X)$, for the diagonal entry \( [\Q_{4,4}]_{s,s} \) of the block \(\Qb_{4,4}\), letting $j_2=\tau(g_2(y_1))$, we have 
  \begin{align*}
    &\Big[-\nabla_{\Q_{4,4}}\Loss^{L_k,2}_{F^{(T_{k-1})}, 5}\Big]_{s,s}= 
      \E\Bigg[
    \attn_{{\ans,1} \rightarrow \ans,1} \cdot \\
    &~~~~~~~~~~~~~ \bigg( (1-\logit_{5,j_2})\cdot \Big(\sum_{r\in\hat{\fA}_{j_2}}\ReLU^{\prime}(\Lambda_{5,j_2, r
    })\cdot \Big( V_{j_2,  r}(y_1)- \Lambda_{5,j_2, r}\pm\tilde{O}(\sigma_0) \Big)\pm \tilde{O}(\delta^{q}) \Big)\\   
    &~~~~~~~~~~-\sum_{j\neq j_2\in\tau(\cY)}\logit_{5,j} \cdot \Big(\sum_{r\in\hat{\fA}_{j}}\ReLU^{\prime}(\Lambda_{5,j,r
    })\cdot  \Big( V_{j, r}(y_1)- \Lambda_{5,j,r}\pm\tilde{O}(\sigma_0) \Big)\pm \tilde{O}(\delta^{q}) \Big) \\
    &~~~~~~~~~~~~\pm\sum_{j\notin\tau(\cY)}\logit_{5,j}\tilde{O}(\sigma^{q}_0)  \bigg)\1_{\tau(x_1)=s}\Bigg].
\end{align*}
Moreover, for the off-diagonal entries \( [\Q_{4,3}]_{s,s'} \) with \( s \neq s' \), we have 
  \begin{align*}
    &\Big[-\nabla_{\Q_{4,4}}\Loss^{L_k,2}_{F^{(T_{k-1})}, 5}\Big]_{s,s'}=  \E\Bigg[
    \attn_{{\ans,1} \rightarrow \ans,0} \cdot\\
    &~~~~~~~~~~~~~ \bigg( (1-\logit_{5,j_2})\cdot \Big(\sum_{r\in\hat{\fA}_{j_2}}\ReLU^{\prime}(\Lambda_{5,j_2, r
    })\cdot \Big( V_{j_2,  r}(y_0)- \Lambda_{5,j_2, r}\pm\tilde{O}(\sigma_0) \Big)\pm \tilde{O}(\delta^{q}) \Big)\\   
    &~~~~~~~~~~-\sum_{j\neq j_2\in\tau(\cY)}\logit_{5,j} \cdot \Big(\sum_{r\in\hat{\fA}_{j}}\ReLU^{\prime}(\Lambda_{5,j,r
    })\cdot  \Big( V_{j, r}(y_0)- \Lambda_{5,j,r}\pm\tilde{O}(\sigma_0) \Big)\pm \tilde{O}(\delta^{q}) \Big) \\
    &~~~~~~~~~~~~\pm\sum_{j\notin\tau(\cY)}\logit_{5,j}\tilde{O}(\sigma^{q}_0)  \bigg)\1_{\tau(x_1)=s,\tau(x_0)=s'}\Bigg].
\end{align*}
\end{lemma}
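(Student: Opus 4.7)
The plan is to derive \Cref{lem-refined-grad-Q44-rec} by mirroring the derivation of \Cref{lem-refined-grad-Q43-rec}, i.e., by plugging the activation characterizations already established in this section into the bare gradient formula of \Cref{lem-gradients-Q44-rec}. The first step is to invoke \Cref{fact:gradients-Q-sym-recursive}, which identifies the self-training gradient $-\nabla_{\Q_{4,4}}\Loss^{L_k,2}_{F^{(T_{k-1})},5}$ with the ground-truth-distribution gradient $-\nabla_{\Q_{4,4}}\Loss^{L_k,2}_{5}$, so the rest of the argument is a deterministic rewrite of the formula in \Cref{lem-gradients-Q44-rec}.

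Next I would compute the inner product $\langle \Wb_{5,j,r},\Zb_{\ans,\ell-1}\rangle$ for $\ell\in\{1,2\}$. Using the encoding $\Zb_{\ans,\ell-1}=(\mathbf{0},\mathbf{0},\mathbf{0},e_{x_{\ell-1}},e_{y_{\ell-1}})$, this inner product decomposes into $\langle \Wb_{5,j,r,4},e_{x_{\ell-1}}\rangle+\langle \Wb_{5,j,r,5},e_{y_{\ell-1}}\rangle$; the first summand is $\tilde O(\sigma_0)$ by \Cref{lem-prop-irrelavant-sym} (since $(4,x)\notin\{2\}\times\cG\cup\{5\}\times\cY$) and the second summand equals $V_{j,r}(y_{\ell-1})$ by definition. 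Thus $\langle \Wb_{5,j,r},\Zb_{\ans,\ell-1}\rangle=V_{j,r}(y_{\ell-1})\pm\tilde O(\sigma_0)$, explaining why the refined expression features $V_{j,r}(y_1)$ (resp.\ $V_{j,r}(y_0)$) in place of the $V_{j,r}(g_2)$ (resp.\ $V_{j,r}(g_1)$) appearing in \Cref{lem-refined-grad-Q43-rec}.

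After that I would partition the sum over $j\in[d]$ in \Cref{lem-gradients-Q44-rec} into three buckets: (i) the correct class $j=j_2$, where $\cE_{5,j_2}=1-\logit_{5,j_2}$; (ii) other classes $j\in\tau(\cY)\setminus\{j_2\}$, where $\cE_{5,j}=-\logit_{5,j}$; and (iii) $j\notin\tau(\cY)$. For (i) and (ii), substituting the inner-product identity above and using \Cref{lem-non-activated-neuron-rec} to drop every $r\in\fA_j\setminus\hat\fA_j$ from the sum reduces the neuron sum to $r\in\hat\fA_j$; the residual contribution from neurons outside $\hat\fA_j$ is bounded by $\tilde O(\delta^q)$ via \Cref{lem-prop-psi-sym}. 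For (iii), \Cref{lem-lambda-char-rec}(c) gives $|\Lambda_{5,j,r}|\le\tilde O(\sigma_0)$ and the same reasoning yields $|\langle \Wb_{5,j,r},\Zb_{\ans,\ell-1}\rangle|\le\tilde O(\sigma_0)$, so that the entire $(j\notin\tau(\cY))$ bucket aggregates to the advertised $\pm\sum_{j\notin\tau(\cY)}\logit_{5,j}\,\tilde O(\sigma_0^q)$ error. The off-diagonal case $s\ne s'$ is identical in structure; the only differences are that $\attn_{\ans,1\to\ans,1}$ is replaced by $\attn_{\ans,1\to\ans,0}$, $\Zb_{\ans,1}$ by $\Zb_{\ans,0}$, and the indicator is strengthened to $\1_{\tau(x_1)=s,\tau(x_0)=s'}$ because the off-diagonal entry contributes only when the queried and keyed variables are distinct.

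The computation is essentially mechanical, so there is no genuine obstacle; the only point that deserves care is the legitimacy of invoking \Cref{fact:gradients-Q-sym-recursive}, i.e., verifying that under the greedy annotator $\widehat p_{F^{(T_{k-1})}}$ each intermediate answer in a stage-$k$ rollout coincides with the ground-truth $y_\ell=g_\ell(y_{\ell-1})$. As noted in \Cref{remark-fact-1-rec}, this follows inductively from the previous stage: the attention-concentration guarantee $\epsilon^{L_{k-1}}\le\tilde c$ propagates to $\epsilon^{L_k}\le 2\tilde c$, the attention gap $\Delta^{L_k}$ is non-increasing, and consequently the $\argmax$ under $p_{F^{(T_{k-1})}}$ still selects the correct next state at every step, making the self-training and ground-truth gradients identical and closing the derivation.
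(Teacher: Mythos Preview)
Your proposal is correct and follows precisely the approach the paper takes: the paper states \Cref{lem-refined-grad-Q44-rec} without an explicit proof, treating it as the direct analog of \Cref{lem-refined-grad-Q43-rec} (and ultimately of the simply-transitive case \Cref{lem-refined-grad-Q43}), derived by starting from \Cref{lem-gradients-Q44-rec}, splitting the sum over $j$ into the three buckets $j=j_2$, $j\in\tau(\cY)\setminus\{j_2\}$, $j\notin\tau(\cY)$, substituting the inner-product identity \eqref{eq-inner-product-2-sym}, and invoking \Cref{lem-lambda-char-rec} and \Cref{lem-non-activated-neuron-rec}. One minor citation tweak: the $\tilde O(\delta^q)$ residual from neurons outside $\hat\fA_j$ comes from \Cref{lem-lambda-char-rec}(b) rather than \Cref{lem-prop-psi-sym}, but this does not affect the argument.
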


Following the above calculations, we can further  obtain the gradient summation of $\Qb_{4,3}$ and $\Qb_{4,4}$ as follows:
\begin{lemma}[Gradient sum of $\Qb_{4,3}$ and $\Qb_{4,4}$]
    \label{lem-grad-sum-rec} Given $F^{(T_{k-1})}$ 
    from the previous stage and  $s\in\tau(\X)$,  letting $j_2=\tau(g_2(y_1))$, we have
    \begin{align*}
        &\Big[-\nabla_{\Qb_{4,3}}\Loss^{L_k,2}_{F^{(T_{k-1})}, 5}\Big]_{s,s}+ \Big[-\nabla_{\Qb_{4,3}}\Loss^{L_k,2}_{F^{(T_{k-1})}, 5}\Big]_{s,s}\\
          &=\E\Bigg[
     \bigg( (1-\logit_{5,j_2})\cdot \Big(\sum_{r\in\hat{\fA}_{j_2}}\ReLU^{\prime}(\Lambda_{5,j_2, r
         })\cdot \\
         &~~~~~~~~~~\Big( -\attn_{{\ans,1} \rightarrow \ans,0} \cdot V_{j_2,  r}(y_0)-\sum_{\ell\neq 2} \attn_{{\ans,1} \rightarrow \pred,\ell} \cdot V_{j_2,  r}(g_{\ell})\\
         &~~~~~~~~~~~~~~~~~~+\big(1-\attn_{{\ans,1} \rightarrow \ans,1}-\attn_{{\ans,1} \rightarrow \pred,2}\big) \Lambda_{5,j_2, r}\pm\tilde{O}(\sigma_0) \Big)\pm \tilde{O}(\delta^{q}) \Big)\\   
         &~~~~~+\sum_{j\neq j_2\in\tau(\cY)}\logit_{5,j} \cdot \Big(\sum_{r\in\hat{\fA}_{j}}\ReLU^{\prime}(\Lambda_{5,j,r
         })\cdot  \\
        &~~~~~~~~~~\Big( \attn_{{\ans,1} \rightarrow \ans,0} \cdot V_{j,  r}(y_0)+\sum_{\ell\neq 2} \attn_{{\ans,1} \rightarrow \pred,\ell} \cdot V_{j_2,  r}(g_{\ell})\\
         &~~~~~~~~~~~~~~~~~~-\big(1-\attn_{{\ans,1} \rightarrow \ans,1}-\attn_{{\ans,1} \rightarrow \pred,2}\big) \Lambda_{5,j, r}\pm\tilde{O}(\sigma_0) \Big)\pm \tilde{O}(\delta^{q}) \Big)\\      &~~~~~~~~~~~~\pm\sum_{j\notin\tau(\cY)}\logit_{5,j}\cdot \big(\attn_{{\ans,1} \rightarrow \ans,1}+\attn_{{\ans,1} \rightarrow \pred,2}\big)\tilde{O}(\sigma^{q}_0)  \bigg)\1_{\tau(x_1)=s}\Bigg].
     \end{align*}
\end{lemma}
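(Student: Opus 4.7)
The plan is to derive the claim directly by summing the diagonal gradient expressions from \Cref{lem-refined-grad-Q43-rec} and \Cref{lem-refined-grad-Q44-rec} and then eliminating the $V_{j,r}(g_2)$ and $V_{j,r}(y_1)$ contributions using the decomposition of $\Lambda_{5,j,r}$ from \Cref{lem-lambda-char-rec}. By \Cref{fact:gradients-Q-sym-recursive} the self-training gradient equals the ground-truth gradient for $p\in\{3,4\}$, so the bootstrapped distribution plays no role in this step and the two refined expressions apply verbatim. The structure of both gradients is identical up to the replacement $\attn_{\ans,1\to\pred,2}V_{j,r}(g_2)\leftrightarrow\attn_{\ans,1\to\ans,1}V_{j,r}(y_1)$, so summing them matches factors that share the common $(1-\logit_{5,j_2})\ReLU'(\Lambda_{5,j_2,r})$ and $\logit_{5,j}\ReLU'(\Lambda_{5,j,r})$ weights, allowing the algebra to be carried out inside a single bracket.

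\textbf{Key algebraic step.} After summing, the coefficient attached to each weight becomes
\[
\attn_{\ans,1\to\pred,2}\bigl(V_{j,r}(g_2)-\Lambda_{5,j,r}\bigr)+\attn_{\ans,1\to\ans,1}\bigl(V_{j,r}(y_1)-\Lambda_{5,j,r}\bigr).
\]
The main move is to rewrite $\attn_{\ans,1\to\pred,2}V_{j,r}(g_2)+\attn_{\ans,1\to\ans,1}V_{j,r}(y_1)$ using \Cref{lem-lambda-char-rec}(a), which expresses $\Lambda_{5,j,r}$ on activated neurons as a convex combination of $V_{j,r}(\cdot)$ across all attended clauses; isolating the two ``target'' summands in $\Lambda_{5,j,r}$ and solving for them introduces the negatives of the remaining attended terms plus a $\pm\tilde{O}(\sigma_0)$ residual. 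Substituting back collapses the display above to
\[
-\attn_{\ans,1\to\ans,0}V_{j,r}(y_0)-\sum_{\ell\neq 2}\attn_{\ans,1\to\pred,\ell}V_{j,r}(g_\ell)+\bigl(1-\attn_{\ans,1\to\ans,1}-\attn_{\ans,1\to\pred,2}\bigr)\Lambda_{5,j,r}\pm\tilde{O}(\sigma_0),
\]
which is exactly the inner bracket in the target identity. The $\pm\tilde{O}(\delta^q)$ error propagates from \Cref{lem-non-activated-neuron-rec}: for $r\in\fA_j\setminus\hat{\fA}_j$, $\ReLU'(\Lambda_{5,j,r})=0$, so the sum over $r$ restricts cleanly to $\hat{\fA}_j$ as written.

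\textbf{Residual terms and anticipated obstacle.} For $j\notin\tau(\cY)$, \Cref{lem-lambda-char-rec}(c) together with the feature-magnitude bound of \Cref{lem-prop-irrelavant-sym} bounds both $|\Lambda_{5,j,r}|$ and every relevant inner product by $\tilde{O}(\sigma_0)$; since the two refined expressions carry prefactors $\attn_{\ans,1\to\pred,2}$ and $\attn_{\ans,1\to\ans,1}$ respectively, the combined tail contribution assembles to $\pm\bigl(\attn_{\ans,1\to\pred,2}+\attn_{\ans,1\to\ans,1}\bigr)\tilde{O}(\sigma_0^q)$ as stated. I do not expect a genuine technical obstacle: the lemma is essentially a bookkeeping identity whose only real risk is clerical, namely keeping signs consistent and attaching the two distinct error scales $\tilde{O}(\sigma_0)$ (from the $\Lambda$ substitution) and $\tilde{O}(\delta^q)$ (from non-activated neurons) correctly across the $j=j_2$, $j\neq j_2\in\tau(\cY)$, and $j\notin\tau(\cY)$ branches when merging the two gradient expressions.
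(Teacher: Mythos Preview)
Your proposal is correct and matches the paper's approach: the paper does not give an explicit proof for this lemma, merely stating that it follows from the preceding refined gradient expressions, and your plan of summing \Cref{lem-refined-grad-Q43-rec} and \Cref{lem-refined-grad-Q44-rec} and then substituting via \Cref{lem-lambda-char-rec}(a) to eliminate the $V_{j,r}(g_2)$ and $V_{j,r}(y_1)$ terms is exactly the intended bookkeeping. One minor clarification: the $\pm\tilde{O}(\delta^q)$ error is already present in the two refined gradient lemmas you are summing (it comes from neurons $r\notin\fA_j$ via \Cref{lem-lambda-char-rec}(b), not from \Cref{lem-non-activated-neuron-rec}, which instead justifies the restriction to $\hat{\fA}_j$), so you need not re-derive it here.
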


\paragraph{Notations for gradient decompositions.} Next we define some useful notations to further
simplify the expressions of gradient.

\begin{lemma}\label{lem-grad-decompositions-rec}
    Given $F^{(T_{k-1})}$ 
    from the previous stage and  $s\in\tau(\X)$, we define the following notations for the gradient decompositions:
    \begin{enumerate}
        \item for $[\Q_{4,3}]_{s,s}$  we have $\big[-\nabla_{\Qb_{4,3}}\Loss^{L_k,2}_{F^{(T_{k-1})}, 5}\big]_{s,s}=\E\big[\cN_{s,3,L_k,\rom1}+\cN_{s,3,L_k,\rom2}+\cN_{s,3,L_k,\rom3}\big]$, where 
      \begin{align}
       \cN_{s,3,L_k,\rom1}&= 
    \attn_{{\ans,1} \rightarrow \pred,2} \cdot (1-\logit_{5,j_2})\cdot \label{eq-def-N-s-3-2-1-rec} \\
    &~~~~~~~~~  \Big(\sum_{r\in\hat{\fA}_{j_2}}\ReLU^{\prime}(\Lambda_{5,j_2, r
    })
    \cdot \Big( V_{j_2,  r}(g_2)- \Lambda_{5,j_2, r}\pm\tilde{O}(\sigma_0) \Big)\pm \tilde{O}(\delta^{q}) \Big) 
    \1_{\tau(x_1)=s},\notag
\end{align}
      \begin{align}\label{eq-def-N-s-3-2-2-rec}
       \cN_{s,3,L_k,\rom2}&= -
    \attn_{{\ans,1} \rightarrow \pred,2} \cdot\sum_{j\neq j_2\in\tau(\cY)}\logit_{5,j} \cdot \\
    &~~~~~~~~~~\Big(\sum_{r\in\hat{\fA}_{j}}\ReLU^{\prime}(\Lambda_{5,j,r
    })\cdot  \Big( V_{j, r}(g_2)- \Lambda_{5,j,r}\pm\tilde{O}(\sigma_0) \Big)\pm \tilde{O}(\delta^{q}) \Big) 
\1_{\tau(x_1)=s},\notag
\end{align}
      \begin{align}\label{eq-def-N-s-3-2-3-rec}
       \cN_{s,3,L_k,\rom3}&= \pm
    \attn_{{\ans,1} \rightarrow \pred,2} \cdot\sum_{j\notin\tau(\cY)}\logit_{5,j}\tilde{O}(\sigma^{q}_0)\1_{\tau(x_1)=s}.~~~~~~~~~~~~~~~~~~~~~~~~~~~~~~
\end{align}
    \item for $[\Q_{4,4}]_{s,s}$, we have $\big[-\nabla_{\Qb_{4,4}}\Loss^{L_k,2}_{F^{(T_{k-1})}, 5}\big]_{s,s}=\E\big[\cN_{s,4,L_k,\rom1}+\cN_{s,4,L_k,\rom2}+\cN_{s,4,L_k,\rom3}\big]$, where 
      \begin{align}\label{eq-def-N-s-4-2-1-rec}
       \cN_{s,4,L_k,\rom1}&=
    \attn_{{\ans,1} \rightarrow \ans,1} \cdot (1-\logit_{5,j_2})\cdot \\
    &~~~~~~~~~  \Big(\sum_{r\in\hat{\fA}_{j_2}}\ReLU^{\prime}(\Lambda_{5,j_2, r
    })
    \cdot \Big( V_{j_2,  r}(y_1)- \Lambda_{5,j_2, r}\pm\tilde{O}(\sigma_0) \Big)\pm \tilde{O}(\delta^{q}) \Big) 
    \1_{\tau(x_1)=s},\notag
\end{align}
      \begin{align}\label{eq-def-N-s-4-2-2-rec}
       \cN_{s,4,L_k,\rom2}&= -
    \attn_{{\ans,1} \rightarrow \ans,1} \cdot\sum_{j\neq j_2\in\tau(\cY)}\logit_{5,j} \cdot \\
    &~~~~~~~~~~\Big(\sum_{r\in\hat{\fA}_{j}}\ReLU^{\prime}(\Lambda_{5,j,r
    })\cdot  \Big( V_{j, r}(y_1)- \Lambda_{5,j,r}\pm\tilde{O}(\sigma_0) \Big)\pm \tilde{O}(\delta^{q}) \Big) 
\1_{\tau(x_1)=s},\notag
\end{align}
      \begin{align}\label{eq-def-N-s-4-2-3-rec}
       \cN_{s,4,L_k,\rom3}&= \pm
    \attn_{{\ans,1} \rightarrow \ans,1} \cdot\sum_{j\notin\tau(\cY)}\logit_{5,j}\tilde{O}(\sigma^{q}_0)\1_{\tau(x_1)=s}.~~~~~~~~~~~~~~~~~~~~~~~~~~~~~~
\end{align}
\item for the summation of $[\Q_{4,3}]_{s,s}$ and $[\Q_{4,4}]_{s,s}$, we have $\big[-\nabla_{\Qb_{4,3}}\Loss_5^{2,2}\big]_{s,s}+\big[-\nabla_{\Qb_{4,4}}\Loss_5^{2,2}\big]_{s,s}=\E\big[\cN_{s,2,\rom1}+\cN_{s,2,\rom2}+\cN_{s,2,\rom3}\big]$, where 
 \begin{align*}
    \cN_{s,L_k,\rom1}&=
 (1-\logit_{5,j_2})\cdot \Big(\sum_{r\in\hat{\fA}_{j_2}}\ReLU^{\prime}(\Lambda_{5,j_2, r
     })\cdot \\
     &~~~~~~~~~~\Big( -\attn_{{\ans,1} \rightarrow \ans,0} \cdot V_{j_2,  r}(y_0)-\sum_{\ell\neq 2}\attn_{{\ans,1} \rightarrow \pred,\ell} \cdot V_{j_2,  r}(g_{\ell})\\
     &~~~~~~+\big(1-\attn_{{\ans,1} \rightarrow \ans,1}-\attn_{{\ans,1} \rightarrow \pred,2}\big) \Lambda_{5,j_2, r}\pm\tilde{O}(\sigma_0) \Big)\pm \tilde{O}(\delta^{q}) \Big)\1_{\tau(x_1)=s},\\   
     \cN_{s,L_k,\rom2}&=   \sum_{j\neq j_2\in\tau(\cY)}\logit_{5,j} \cdot \Big(\sum_{r\in\hat{\fA}_{j}}\ReLU^{\prime}(\Lambda_{5,j,r
     })\cdot  \\
    &~~~~~~~~~~\Big( \attn_{{\ans,1} \rightarrow \ans,0} \cdot V_{j,  r}(y_0)+\sum_{\ell\neq 2}\attn_{{\ans,1} \rightarrow \pred,\ell} \cdot V_{j,  r}(g_{\ell})\\
     &~~~~~~~-\big(1-\attn_{{\ans,1} \rightarrow \ans,1}-\attn_{{\ans,1} \rightarrow \pred,2}\big) \Lambda_{5,j, r}\pm\tilde{O}(\sigma_0) \Big)\pm \tilde{O}(\delta^{q}) \Big)\1_{\tau(x_1)=s},\\      \cN_{s,L_k,\rom3}&=\pm\sum_{j\notin\tau(\cY)}\logit_{5,j}\cdot \big(\attn_{{\ans,1} \rightarrow \ans,1}+\attn_{{\ans,1} \rightarrow \pred,2}\big)\tilde{O}(\sigma^{q}_0)  \1_{\tau(x_1)=s}.
 \end{align*}
    \end{enumerate}
\end{lemma}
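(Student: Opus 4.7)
The plan is to prove \Cref{lem-grad-decompositions-rec} as a purely algebraic bookkeeping corollary of the refined gradient expressions \Cref{lem-refined-grad-Q43-rec,lem-refined-grad-Q44-rec}, together with the pre-activation decomposition \Cref{lem-lambda-char-rec}. Parts~1 and~2 of the lemma (the separate decompositions for $\Qb_{4,3}$ and $\Qb_{4,4}$) are immediate: starting from the refined expressions, I would simply split the outer sum $\sum_{j\in[d]}$ according to the three disjoint cases $j=j_2$, $j\in\tau(\cY)\setminus\{j_2\}$, and $j\notin\tau(\cY)$. In the first case the error derivative $\Ecal_{5,j_2}$ equals $1-\logit_{5,j_2}$ and the single activated neuron is $r_{g_2\cdot y_1}\in\hat{\fA}_{j_2}$; in the second case $\Ecal_{5,j}=-\logit_{5,j}$ and we retain only $r\in\hat{\fA}_j$ thanks to \Cref{lem-non-activated-neuron-rec}; in the third case \Cref{lem-lambda-char-rec}(c) bounds the pre-activations by $\tilde{O}(\sigma_0)$, which propagates the claimed $\tilde{O}(\sigma_0^q)$ bound after one application of $\srelu'$.

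For Part~3 (the sum $\bigl[-\nabla_{\Qb_{4,3}}\Loss\bigr]_{s,s}+\bigl[-\nabla_{\Qb_{4,4}}\Loss\bigr]_{s,s}$) I would pair the corresponding summands from \Cref{lem-refined-grad-Q43-rec} and \Cref{lem-refined-grad-Q44-rec} at fixed $j$ and $r$. The key algebraic identity is
\begin{align*}
&\attn_{\ans,1\to\pred,2}\bigl(V_{j,r}(g_2)-\Lambda_{5,j,r}\bigr)+\attn_{\ans,1\to\ans,1}\bigl(V_{j,r}(y_1)-\Lambda_{5,j,r}\bigr)\\
&\quad=\bigl(1-\attn_{\ans,1\to\pred,2}-\attn_{\ans,1\to\ans,1}\bigr)\Lambda_{5,j,r}\\
&\qquad-\sum_{\ell\neq 2}\attn_{\ans,1\to\pred,\ell}V_{j,r}(g_\ell)-\attn_{\ans,1\to\ans,0}V_{j,r}(y_0)\pm\tilde{O}(\sigma_0),
\end{align*}
which follows by substituting the pre-activation expansion of \Cref{lem-lambda-char-rec}(a) into the terms $\attn_{\ans,1\to\pred,2}V_{j,r}(g_2)+\attn_{\ans,1\to\ans,1}V_{j,r}(y_1)$ and solving for the self-features. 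Inserting this identity into the paired expression recovers exactly the form claimed for $\cN_{s,L_k,\rom1}$ and $\cN_{s,L_k,\rom2}$, while the $\tilde{O}(\sigma_0^q)$ residual for $j\notin\tau(\cY)$ combines with the total attention mass $\attn_{\ans,1\to\ans,1}+\attn_{\ans,1\to\pred,2}$ to produce $\cN_{s,L_k,\rom3}$.

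Finally, \Cref{fact:gradients-Q-sym-recursive} is used at the very start to replace $-\nabla_{\Qb_{p,q}}\Loss^{L_k,2}_{F^{(T_{k-1})},5}$ with $-\nabla_{\Qb_{p,q}}\Loss^{L_k,2}_5$, so that the expectation is taken with respect to the ground-truth LEGO distribution (under which $j_2=\tau(g_2(y_1))$ is indeed the correct label). This justifies the use of $\Ecal_{5,j_2}=1-\logit_{5,j_2}$ throughout.

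There is no substantive obstacle in this proof: the entire argument is algebraic rearrangement followed by careful accounting of the $\tilde{O}(\sigma_0)$ and $\tilde{O}(\delta^q)$ error terms inherited from \Cref{lem-prop-psi-sym,lem-prop-irrelavant-sym,lem-lambda-char-rec}. The mildly delicate point is simply to verify that in the sum expression all the $V_{j,r}(g_2)$ and $V_{j,r}(y_1)$ self-feature terms cancel cleanly against the corresponding contributions inside $\Lambda_{5,j,r}$, leaving only the residual mass $\bigl(1-\attn_{\ans,1\to\pred,2}-\attn_{\ans,1\to\ans,1}\bigr)\Lambda_{5,j,r}$ and the ``orthogonal'' feature contributions from $g_\ell$ with $\ell\neq 2$ and from $y_0$.
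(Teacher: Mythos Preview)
Your proposal is correct and matches the paper's approach: the paper states this lemma without proof, treating it as an immediate notational consequence of the refined gradient expressions \Cref{lem-refined-grad-Q43-rec,lem-refined-grad-Q44-rec} and \Cref{lem-grad-sum-rec}, obtained exactly by the three-way split over $j$ that you describe. Your algebraic identity for Part~3 is precisely the content of \Cref{lem-grad-sum-rec} (specialized via \Cref{lem-lambda-char-rec}(a)), so you have correctly filled in the details the paper leaves implicit.
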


\subsubsection{Probabilistic Events}

We introduce the following events:
\begin{align*}
  \cE_{L_k,1}
  &= \Bigg\{ \frac{1}{L_k}\sum_{\ell\in [L_k]\setminus\{2\}}
      \1\!\big\{ g_\ell(y_1)=g_2(y_1) \big\} \;\le\; U_k \Bigg\},\\
  \cE_{L_k,2}
  &= \big\{ y_0 = y_1 \big\},\\
  \cE_{L_k,3}
  &= \Bigg\{ \max_{y\in\cY}
      \sum_{\ell\in [L_k]\setminus\{2\}}
      \1\!\big\{ g_\ell(y)=g_2(y) \big\} \;\le\; W_k \Bigg\}.
\end{align*}
We set
\[
  U_k \;=\; \frac{1}{L_k}\,\Big\lceil 1+\frac{L_k}{8}\Big\rceil \;=\; \Theta(1),
\qquad
  W_k \;=\;
  \begin{cases}
    \Theta\!\Big(\dfrac{\log n_y}{\log\!\big(\frac{4n_y}{L_k}\,\log n_y\big)}\Big),
      & L_k \le n_y\log n_y,\\[0.75em]
    \Theta\!\big(\dfrac{L_k}{n_y}\big),
      & L_k \ge n_y\log n_y.
  \end{cases}
\]

By standard balls-into-bins tail bounds and maximum-load estimates
(e.g., \cite{raab1998balls}), we obtain
\[
  \Pr\!\big(\Zb^{L_k,1}\notin \cE_{L_k,1}\big)
  \;=\;
  \begin{cases}
    O\!\big(n_y^{-\,U_kL_k+1}\big), & L_k \ll n_y,\\[0.25em]
    \exp\!\big(-\Theta(n_y\log n_y)\big), & L_k = \Theta(n_y),\\[0.25em]
    \exp\!\big(-\Theta(L_k\log n_y)\big), & L_k \gg n_y,
  \end{cases}
\]
and
\[
  \Pr\!\big(\Zb^{L_k,1}\in \cE_{L_k,3}\big)\;=\;1 - n_y^{-\Omega(1)}.
\]

    \subsection{Reducing the Wrong Attention}

As discussed in \Cref{remark-fact-1-rec}, both $\ea^{L_k}$ and the attention gap remain bounded by a small constant at the beginning of stage~$k$. Hence, we can directly proceed to stage~1.2.2 of the convergence analysis as $\cT^2$. Moreover, by the symmetry between $[\Qb_{4,3}]_{s,s}$ and $[\Qb_{4,4}]_{s,s}$, we may, without loss of generality, restrict attention to a particular $s \in \tau(\X)$ and analyze the corresponding loss $\Loss^{L_k,2}_{5,s}$ in what follows.

\begin{induction}\label{induction-s22-rec}
    Given $F^{(T_{k-1})}$ 
    from the previous stage $\cT^{L_{k-1}}$ with $L_{k-1}=2^{k-1}$ for $k\geq 2$, let $T_{k}$ denote the first time that $\Loss^{L_k,2}_{5,s}$ decreases below  $e^{(-\frac{1}{2}+{3.01c_1})\cdot 2B}$.
       For all iterations $t\leq T_{k}$, we have the following holds
       \begin{enumerate}
          \item $\big[\Qb^{(t)}_{4,3}\big]_{s,s}+\big[\Qb^{(t)}_{4,4}\big]_{s,s}$ monotonically increases; 
          \item for any sample $\Zb^{2,1}$, we have $\attn^{(t)}_{\ans,1\to \pred,2}-\attn^{(t)}_{\ans,1\to \ans,1}\leq c_1$ for some sufficiently small constant $c_1=\frac{1.005\log d}{2B}>0$; 
          \item for any $\Zb^{2,1}$, we have $\attn^{(t)}_{\ans,1\to \ans,1}- \attn^{(t)}_{\ans,1\to \pred,2}\leq \min\Big\{\frac{L_{k}-1}{L_k}\ate^{L_k,2}-c_2,0\Big\}$, where $c_2=\frac{\log d}{4B}>0$ is some sufficiently small constant.
          \item for $p\in\{3,4\}$, for $s'\in\tau(\X)\not=s$, $|\big[\Qb^{(t)}_{4,p}\big]_{s,s'}|\leq O\bigg(\frac{[\Qb^{(t)}_{4,p}]_{s,s}}{d}\bigg)$ ; otherwise $[\Qb^{(t)}_{4,p}]_{s,s'}=0$.
       \end{enumerate}
\end{induction}

\subsubsection{Attention and Lambda Preliminaries}
\begin{lemma}\label{lem-s22-attn-rec}
    If \Cref{induction-s22-rec} holds for all iterations $[T_{k-1},t)$,then for any sample $\Zb^{L_k,1}$, we have  
         \begin{enumerate}
            \item $\ate^{L_k}\in \Big[\frac{4}{3}c_1 , 2\ate^{L_{k-1}} \Big]$; 
            \item   $\attn^{(t)}_{\ans,1\to \kk}-\attn^{(t)}_{\ans,1\to \kk'}\geq \Omega(1)$ for any $\kk,\kk'\in \cI^{L_k,1}\setminus \{(\ans,1),(\pred,2)\}$; 
   \item  $\big|\attn^{(t)}_{\ans,1\to \kk}-\attn^{(t)}_{\ans,1\to \kk'}\big|\leq \tilde{O}\big(\frac{1}{d}\big)$ for any $\kk,\kk'\in \cI^{L_k,1}\setminus \{(\ans,1)\}$.
            \end{enumerate} 

\end{lemma}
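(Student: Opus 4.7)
}
My plan is to analyze the softmax defining the attention weights at query position $(\ans,1)$ by leveraging the block-sparse structure in \Cref{assump-Q-structure} together with item~(4) of \Cref{induction-s22-rec}. Under the induction hypothesis, only the diagonal entries $[\Qb^{(t)}_{4,3}]_{s,s}$ and $[\Qb^{(t)}_{4,4}]_{s,s}$ contribute at an $\Omega(1)$ scale, while all off-diagonal entries are $\tilde O(1/d)$. Consequently, writing out the logits $\Zb_{\ans,1}^\top \Qb^{(t)} \Zb_{\kk}$ as in \eqref{eq:Q-role}, the two key positions $\kk=(\pred,2)$ and $\kk=(\ans,1)$ each produce a score of order $[\Qb^{(t)}_{4,3}]_{s,s}$ or $[\Qb^{(t)}_{4,4}]_{s,s}$ respectively (since $x_1$ appears simultaneously in the third slot of $\Zb_{\pred,2}$ and fourth slot of $\Zb_{\ans,1}$), whereas every other key position $\kk\in \cI^{L_k,1}\setminus\{(\ans,1),(\pred,2)\}$ contributes only off-diagonal inner products of magnitude $\tilde O(1/d)$.

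From this I will deduce parts (2) and (3) directly. For the irrelevant positions, all scores coincide up to $\tilde O(1/d)$, yielding $|\attn^{(t)}_{\ans,1\to\kk} - \attn^{(t)}_{\ans,1\to\kk'}| \le \tilde O(1/d)$ after applying the Lipschitz property of softmax. The $\Omega(1)$ separation between the attention on a key position in $\{(\ans,1),(\pred,2)\}$ and any irrelevant position follows because the former enjoys an exponent of order $\Theta(1)$ while the latter receives exponent $\tilde O(1/d)$; dividing by the normalizing partition $\Theta(L_k)$ still leaves a multiplicative gap of order one, which I will turn into an additive $\Omega(1)$ gap using the lower bounds $\attn^{(t)}_{\ans,1\to(\pred,2)},\attn^{(t)}_{\ans,1\to(\ans,1)} \ge \Omega(1)$ that come from $\big[\Qb^{(t)}_{4,p}\big]_{s,s}=\Theta(1)$.

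For part (1), the upper bound $\ate^{L_k} \le 2\ate^{L_{k-1}}$ will follow from a direct counting argument: part (3) shows that every irrelevant position contributes nearly identical mass, so the total wrong-attention mass scales proportionally with the number of irrelevant positions. Going from task length $L_{k-1}$ to $L_k = 2L_{k-1}$ at most doubles that count, yielding the stated factor (the $\tilde O(1/d)$ perturbations are absorbed). The lower bound $\ate^{L_k} \ge \tfrac{4}{3}c_1$ is an immediate consequence of item~(3) of \Cref{induction-s22-rec} combined with the constraint $\attn^{(t)}_{\ans,1\to\pred,2}-\attn^{(t)}_{\ans,1\to\ans,1}\le c_1$ from item~(2): rearranging these gives $\attn^{(t)}_{\ans,1\to\ans,0}+\attn^{(t)}_{\ans,1\to\pred,1} \ge$ a positive constant multiple of $c_1$, and since the remaining irrelevant positions contribute nonnegatively, this forces $\ate^{L_k}\ge \tfrac{4}{3}c_1$ after plugging in the constants chosen in \Cref{induction-s22-rec}.

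The main obstacle I anticipate is making the proportionality in the upper bound $\ate^{L_k} \le 2\ate^{L_{k-1}}$ tight enough to preserve the precise factor $2$ rather than $2+o(1)$: the $\tilde O(1/d)$ slack across $L_k = 2^k$ positions must be shown to be negligible relative to $\ate^{L_{k-1}}$, which the convergence threshold $e^{(-\frac{1}{2}+3.01c_1)2B}$ from the stopping rule of the previous stage should guarantee as long as $L_k \ll d$. The rest reduces to careful softmax bookkeeping that follows the pattern already established in \Cref{sec-s21-attn-cyc,lem-s22-attn-non}.
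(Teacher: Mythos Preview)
Your treatment of parts (2), (3), and the upper bound in part (1) is sound and matches the pattern the paper uses elsewhere. The gap is in your argument for the lower bound $\ate^{L_k}\ge \tfrac{4}{3}c_1$.

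You claim this follows by ``rearranging'' items (2) and (3) of \Cref{induction-s22-rec}. Concretely, item (3) gives
\[
\attn^{(t)}_{\ans,1\to\ans,1}-\attn^{(t)}_{\ans,1\to\pred,2}\;\le\;\tfrac{L_k-1}{L_k}\,\ate^{L_k}-c_2,
\]
which rearranges to $\ate^{L_k}\ge \tfrac{L_k}{L_k-1}\big(c_2+(\attn_{\ans,1\to\ans,1}-\attn_{\ans,1\to\pred,2})\big)$. Item (2) only tells you the bracketed gap is $\ge -c_1$, so the best you get is $\ate^{L_k}\ge \tfrac{L_k}{L_k-1}(c_2-c_1)$. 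With the paper's constants $c_1=\tfrac{1.005\log d}{2B}$ and $c_2=\tfrac{\log d}{4B}$ we have $c_2-c_1<0$, so this is vacuous. Items (2)--(3) control only the \emph{gap} $\attn_{\ans,1\to\pred,2}-\attn_{\ans,1\to\ans,1}$, not the \emph{sum}, and hence cannot by themselves lower-bound $\ate^{L_k}=1-\attn_{\ans,1\to\pred,2}-\attn_{\ans,1\to\ans,1}$.

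The paper obtains the lower bound by a different mechanism you have not invoked: the loss stopping criterion. Since $t<T_k$, the self-training loss $\Loss^{L_k,2}_{5,s}$ is still above the threshold $e^{(-\frac12+3.01c_1)2B}$. The paper upper-bounds the worst-case per-sample log-probability in terms of $\ate^{L_k}$ (taking the most confusing configuration $g_\ell(y_0)=g_2(y_0)$ for all $\ell\ne 2$, $y_0\ne y_1$) and shows that if $\ate^{L_k}\le \tfrac{4}{3}c_1$ then the loss would already have dropped below the threshold, contradicting $t<T_k$. You need this contradiction-via-stopping-rule argument; the induction items alone do not suffice.
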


\begin{proof}
    The highest loss occurs when the sample $\Zb^{L_k}$ makes the wrong answer highly confused with the correct answer. Thus, we consider the case that $g_{\ell} \cdot y_0=g_2\cdot y_0$ for all $\ell\in [L_k]\setminus \{2\}$ and $y_0\neq y_1$.
    \begin{align*}
        &\log p_{F}(\Z_{\ans,2,5}|\Z^{(L_k,1)})\\
        &\leq \Theta(1)\cdot \max\Bigg\{
            e^{ \big(\attn^{(t)}_{\ans,1\to \pred,2}+\frac{L_k-1}{L_k}\ate^{L_k,2}-\attn^{(t)}_{\ans,1\to \ans,1}\big)2B
             - \big(\attn^{(t)}_{\ans,1\to \pred,2}-\frac{1}{L_k}\ate^{L_k,2}\big)2B}, \\
        &\qquad\qquad e^{\log d - \big(\attn^{(t)}_{\ans,1\to \pred,2}-\frac{1}{L_k}\ate^{L_k,2}\big)2B}
        \Bigg\}\\
        &\leq \Theta\!\left( e^{(\frac{L_k-1}{L_k}\ate^{L_k,2}+c_1)2B - \big(\attn^{(t)}_{\ans,1\to \pred,2}-\frac{1}{L_k}\ate^{L_k,2}\big)2B} \right) \\
        &\leq \Theta\!\left( e^{-\left(\tfrac{1}{2}-\frac{3}{2}\ate^{L_k,2}-c_1\right)2B} \right).
    \end{align*}

  Thus, if $\ate^{L_k}\leq \tfrac{4}{3}c_1$, we must have
    \[
    \Loss_{5,s}^{2,2}\;\leq\; \Theta\!\left(e^{(-\tfrac{1}{2} + 3\cdot\tfrac{2c_1}{3} + c_1)2B}\right)
    = \Theta\!\left(e^{-(\tfrac{1}{2} + 3c_1)2B}\right),
    \]
    which contradicts the definition of $T_{1,2,2,s}$.  
    Therefore, it must hold that $\ate^{L_k} \geq \tfrac{4}{3}c_1$ for all $t \leq T_{k}$.
    
\end{proof}

\begin{lemma}\label{lem-s22-lambda-1-rec}
    If \Cref{induction-s22-rec} holds for all iterations $[T_{k-1},t)$, then given $\Zb^{L_k,1}\notin \cE_{L_k,2}$, we have 
\begin{enumerate}
    \item for the prediction $j_2$, 
    we have 
    \begin{align*}
       & \Lambda^{(t)}_{5,j_2,r_{g_2\cdot y_1}}= \Big(\sum_{\ell\in [L_k]}\1_{g_{\ell}(y_1)=g_2(y_1)}\attn^{(t)}_{\ans,1\to \pred,\ell}-\attn^{(t)}_{\ans,1\to \ans,0}\Big) \cdot 2B + O\Big(\frac{B}{n_y}\Big)+ O(\delta).
    \end{align*}
    \item for the prediction $j'_2=\tau\big(g_2(y_0)\big)$, 
    we have 
\begin{align*}
    &\Lambda^{(t)}_{5,j'_2,r_{g_2\cdot y_0}}
     = \Big(\sum_{\ell\in [L_k]}\1_{g_{\ell}(y_0)=g_2(y_0)}\attn^{(t)}_{\ans,1\to \pred,\ell}-\attn^{(t)}_{\ans,1\to \ans,1}\Big) \cdot 2B + O\Big(\frac{B}{n_y}\Big)+ O(\delta).
\end{align*}

\item for other $j=g_2(y)\in\tau(\cY)$, noticing that for this case $y\neq y_0, y_1$, then we have 
\begin{align*}
    &\Lambda^{(t)}_{5,j,r_{g_2\cdot y}}
     = \Big(\sum_{\ell\in [L_k]}\1_{g_{\ell}(y)=g_2(y)}\attn^{(t)}_{\ans,1\to \pred,\ell}-\attn^{(t)}_{\ans,1\to \ans,1}-\attn^{(t)}_{\ans,1\to \ans,0}\Big) \cdot 2B \\
     &~~~~~~~~+ O\Big(\frac{B}{n_y}\Big)+ O(\delta).
\end{align*}
\end{enumerate}

\end{lemma}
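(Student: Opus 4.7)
}
My plan is to specialize the general expression of Lemma~\ref{lem-lambda-char-rec}(a) for $\Lambda^{(t)}_{5,j,r}$ to the three choices of class/neuron pairs, and then bookkeep which tokens land on the ``correct'' side of the fiber (contributing $\approx +B$), which land on the opposite side (contributing $\approx -B$ via the cancellation in \eqref{attn-init-prop-sym-2}--\eqref{attn-init-prop-sym-3}), and which land on the same value with the balance factor $C_\alpha = \Theta(n_y)$ built into $V_{j,r}(y)$ via Lemma~\ref{lem-prop-psi-sym}. Under $\Zb^{L_k,1}\notin \cE_{L_k,2}$ we have $y_0\neq y_1$, so the only two answer tokens that appear in the attention sum are distinct, which keeps their contributions cleanly separable.

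For case 1, I start from
\[
\Lambda^{(t)}_{5,j_2,r_{g_2\cdot y_1}} = \sum_{\ell=1}^{L_k}\attn^{(t)}_{\ans,1\to \pred,\ell}\,V_{j_2,r_{g_2\cdot y_1}}(g_\ell) + \sum_{\ell=1}^{L_k}\attn^{(t)}_{\ans,1\to \ans,\ell-1}\,V_{j_2,r_{g_2\cdot y_1}}(y_{\ell-1}) \pm \tilde O(\sigma_0).
\]
By Lemma~\ref{lem-prop-psi-sym}, $V_{j_2,r_{g_2\cdot y_1}}(g_\ell)$ is $\approx B$ when $g_\ell\in \fiber_{j_2,y_1}$ (equivalently $g_\ell(y_1)=g_2(y_1)$) and $\approx -B$ otherwise, up to $O(\delta)$; and $V_{j_2,r_{g_2\cdot y_1}}(y_1)\approx 2B/n_y$ with $V_{j_2,r_{g_2\cdot y_1}}(y_0)\approx -B$. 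Collecting the $+B$ pieces from matched predicates, pairing the $-B$ pieces with their mates using the cancellation identities $|V_{j,r}(g')+V_{j,r}(y)|\le O(\delta)$ and Lemma~\ref{lem-s22-attn-rec}(3) (equal answer attentions up to $\tilde O(1/d)$), the mismatched predicate terms telescope against the $y_0$ attention and leave the claimed $\big(\sum_\ell \mathbf 1_{g_\ell(y_1)=g_2(y_1)}\attn^{(t)}_{\ans,1\to\pred,\ell} - \attn^{(t)}_{\ans,1\to\ans,0}\big)\cdot 2B$, with the $V(y_1)$ term producing the $O(B/n_y)$ residual.

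Case 2 proceeds symmetrically at the neuron $r_{g_2\cdot y_0}$: now $y_0$ plays the role of the ``correct'' value, so $V_{j'_2,r_{g_2\cdot y_0}}(y_0)\approx 2B/n_y$ and $V_{j'_2,r_{g_2\cdot y_0}}(y_1)\approx -B$; the predicate sum keeps exactly those $\ell$ with $g_\ell(y_0)=g_2(y_0)$, the answer sum leaves $-\attn^{(t)}_{\ans,1\to\ans,1}\cdot 2B$ after cancellation with $y_1$, and the $V(y_0)$ contribution is absorbed into the $O(B/n_y)$ term. Case 3 is the cleanest since $y\neq y_0,y_1$, so $V_{j,r_{g_2\cdot y}}$ evaluated at both $y_0$ and $y_1$ sits on the negative side of the fiber and yields $-\attn^{(t)}_{\ans,1\to\ans,0}-\attn^{(t)}_{\ans,1\to\ans,1}$ after applying the cancellation bound; the predicate sum then picks out exactly $\{\ell : g_\ell(y)=g_2(y)\}$.

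The only mildly delicate step, which I expect to be the main obstacle, is controlling the cross terms when a mismatched predicate $g_\ell\notin \fiber_{j,y}$ is paired with an answer token $y_{\ell'-1}$ for which $V_{j,r}(g_\ell)+V_{j,r}(y_{\ell'-1})$ is not one of the two cancellation identities of Lemma~\ref{lem-prop-psi-sym}; here I would use Lemma~\ref{lem-s22-attn-rec}(3) to replace all non-$(\ans,1)$ answer attentions by a common value up to $\tilde O(1/d)$, so that the ``extra'' $+B$ and $-B$ terms collapse in pairs and only an $O(B/n_y)+O(\delta)$ remainder survives. The remaining unactivated-neuron contribution is $O(\delta)$ by Lemma~\ref{lem-non-activated-neuron-rec}, completing the stated bounds.
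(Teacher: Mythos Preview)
Your overall strategy is right and matches what the paper does implicitly: expand $\Lambda^{(t)}_{5,j,r}$ via Lemma~\ref{lem-lambda-char-rec}(a) (with the index ranges corrected to $L_k$ predicate clauses and two answer clauses) and then read off the values of $V_{j,r}(\cdot)$ from Lemma~\ref{lem-prop-psi-sym}. However, the magnitudes you quote are those of the \emph{simply transitive} case, not the symmetry case, and this breaks your ``telescoping'' mechanism.

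Concretely, for the neuron $r_{g_2\cdot y_1}$ with class $j_2$, Lemma~\ref{lem-prop-psi-sym} together with $C_\alpha=\Theta(n_y)$ gives
\[
V(g)\approx 2B,\qquad V(y_1)\approx \tfrac{2B}{n_y},\qquad V(g')\approx -\tfrac{2B}{n_y},\qquad V(y_0)\approx -2B,
\]
for $g\in\fiber_{j_2,y_1}$, $g'\notin\fiber_{j_2,y_1}$, and $y_0\neq y_1$ (from \eqref{attn-init-prop-sym-1}--\eqref{attn-init-prop-sym-3}). So mismatched predicates contribute $O(B/n_y)$, not $-B$; there is nothing to ``telescope against the $y_0$ attention''. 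With the correct magnitudes the computation is actually simpler than you describe: the leading $\cdot\,2B$ contributions come only from matched predicates (positive) and the wrong-value answer token (negative), while \emph{all} mismatched predicates and the correct-value answer token are $O(B/n_y)$ and are absorbed directly into the error term, using that the total attention sums to $1$. This immediately yields each of the three displays. The cancellation identities \eqref{attn-init-prop-sym-2}--\eqref{attn-init-prop-sym-3} are used to \emph{derive} these magnitudes, not to pair terms in the final sum; compare with the explicit $\tfrac{2B}{n_y}$ coefficients in Lemma~\ref{lem-s21-lambda-1-sym}. The same correction applies to your cases~2 and~3 (in case~3 both $y_0$ and $y_1$ are ``wrong values'' for the neuron $r_{g_2\cdot y}$, so each contributes $-2B$, giving the $-\attn_{\ans,1\to\ans,1}-\attn_{\ans,1\to\ans,0}$ term directly).
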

\begin{lemma}\label{lem-s22-lambda-2-rec}
    If \Cref{induction-s22-rec} holds for all iterations $[T_{k-1},t)$, then given $\Zb^{L_k,1}\in \cE_{L_k,2}$, we have 
\begin{enumerate}
    \item for the prediction $j_2$, 
    we have 
    \begin{align*}
       & \Lambda^{(t)}_{5,j_2,r_{g_2\cdot y_1}}= \sum_{\ell\in [L_k]}\1_{g_{\ell}(y_1)=g_2(y_1)}\attn^{(t)}_{\ans,1\to \pred,\ell}\cdot 2B + O\Big(\frac{B}{n_y}\Big)+ O(\delta).
    \end{align*}

\item for other $j=\tau(g_2(y))\in\tau(\cY)$ with $y\neq y_1$,  then  we have 
\begin{align*}
    &\Lambda^{(t)}_{5,j,r_{g_2\cdot y}}
     = \Big(\sum_{\ell\in [L_k]}\1_{g_{\ell}(y)=g_2(y)}\attn^{(t)}_{\ans,1\to \pred,\ell}-\attn^{(t)}_{\ans,1\to \ans,1}-\attn^{(t)}_{\ans,1\to \ans,0}\Big) \cdot 2B \\
     &~~~~~~~~+ O\Big(\frac{B}{n_y}\Big)+ O(\delta).
\end{align*}
\end{enumerate}

\end{lemma}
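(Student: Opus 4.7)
The plan is to directly evaluate the decomposition from \Cref{lem-lambda-char-rec}, specialized to the event $\cE_{L_k,2}$ where $y_0=y_1$, and then substitute the feature magnitudes that were established for the FFN layer in \Cref{thm:learning-symmetric-actions} and \Cref{lem-prop-psi-sym}. The key facts I will use are: for the activated neuron $r_{j,y}$ indexing a fiber $(\fiber_{j,y}, y)$, one has $V_{j,r_{j,y}}(g) + V_{j,r_{j,y}}(y) = 2B \pm O(\delta)$ together with the symmetry relation $V_{j,r_{j,y}}(g) \approx C_\alpha V_{j,r_{j,y}}(y)$ with $C_\alpha = \Theta(n_y)$, yielding $V_{j,r_{j,y}}(g) = 2B - \Theta(B/n_y) \pm O(\delta)$ and $V_{j,r_{j,y}}(y) = \Theta(B/n_y) \pm O(\delta)$; furthermore, the cancellation bounds give $|V_{j,r_{j,y}}(g) + V_{j,r_{j,y}}(y')| \leq O(\delta)$ for $y' \neq y$ and $|V_{j,r_{j,y}}(g') + V_{j,r_{j,y}}(y)| \leq O(\delta)$ for $g' \notin \fiber_{j,y}$.

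For part~(1), I take $j=j_2=\tau(g_2(y_1))$ and $r=r_{g_2 \cdot y_1} = r_{j_2, y_1}$. I split the predicate sum $\sum_{\ell=1}^{L_k} \attn_{\ans,1\to\pred,\ell}\,V_{j_2,r}(g_\ell)$ according to whether $g_\ell \in \fiber_{j_2,y_1}$, i.e.\ whether $g_\ell(y_1)=g_2(y_1)$. On the "in-fiber" indices each $V_{j_2,r}(g_\ell) = 2B - \Theta(B/n_y) \pm O(\delta)$, contributing $2B \cdot \sum_{\ell}\1\{g_\ell(y_1)=g_2(y_1)\}\attn_{\ans,1\to\pred,\ell}$ up to $O(B/n_y)$. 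On the "out-of-fiber" indices the cancellation bound yields $V_{j_2,r}(g_\ell) = -V_{j_2,r}(y_1) \pm O(\delta) = -\Theta(B/n_y)\pm O(\delta)$, and since $\sum_\ell \attn_{\ans,1\to\pred,\ell} \leq 1$ the total contribution is absorbed into $O(B/n_y)$. For the answer sum, $\cE_{L_k,2}$ forces $y_0=y_1$, so both answer-clause terms reduce to $(\attn_{\ans,1\to\ans,0}+\attn_{\ans,1\to\ans,1}) \cdot V_{j_2,r}(y_1) = O(B/n_y)$, which is again absorbed.

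For part~(2), I take $j=\tau(g_2(y))$ with $y \neq y_1$ and $r = r_{g_2 \cdot y} = r_{j,y}$. The same predicate dichotomy gives the leading term $2B \cdot \sum_\ell \1\{g_\ell(y)=g_2(y)\}\attn_{\ans,1\to\pred,\ell}$ modulo an $O(B/n_y)$ error. The crucial difference is in the answer contribution: since both $y_0 = y_1 \neq y$ and the cancellation relation gives $|V_{j,r_{j,y}}(g)+V_{j,r_{j,y}}(y_1)| \leq O(\delta)$ for $g \in \fiber_{j,y}$, one has $V_{j,r_{j,y}}(y_1) = -V_{j,r_{j,y}}(g) \pm O(\delta) = -2B + \Theta(B/n_y) \pm O(\delta)$. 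Hence each attended answer clause contributes $\attn \cdot (-2B)$, producing the explicit $-2B(\attn_{\ans,1\to\ans,1} + \attn_{\ans,1\to\ans,0})$ term, with the residual $\Theta(B/n_y)$ pieces again absorbed into $O(B/n_y)$.

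The main obstacle in the argument is bookkeeping rather than analysis: one must carefully aggregate the many small $\Theta(B/n_y)$ residuals coming from out-of-fiber predicate contributions and from the asymmetry $V(g) \neq V(y)$ under $C_\alpha=\Theta(n_y)$, and confirm that they can be uniformly absorbed into a single $O(B/n_y)$ slack using $\sum_\ell \attn_{\ans,1\to\pred,\ell} \leq 1$. A secondary point is to verify that every neuron $r \in \hat{\fA}_j$ other than the unique $r_{g_2\cdot y}$ is not activated, so that the cases enumerated above exhaust the relevant contributions; this follows from \Cref{lem-non-activated-neuron-rec} combined with the fact that all such neurons have pre-activation bounded above by $O(\delta)$, but needs to be checked cleanly under the current attention profile guaranteed by \Cref{induction-s22-rec}.
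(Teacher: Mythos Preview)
Your approach is correct and matches the paper's implicit method: substitute the feature-magnitude bounds from \Cref{lem-prop-psi-sym} and \Cref{thm:learning-symmetric-actions} into the decomposition in \Cref{lem-lambda-char-rec}, split the predicate sum over $\ell$ by whether $g_\ell$ lies in the relevant fiber, and exploit $y_0=y_1$ on $\cE_{L_k,2}$ to handle the answer-clause terms. The paper does not write out this proof explicitly, as it follows the same template as the preceding Lambda lemmas (e.g., \Cref{lem-s22-lambda-1-rec} and the analogous $\cT^2$ lemmas); your bookkeeping of the $O(B/n_y)$ residuals via $\sum_\ell \attn_{\ans,1\to\pred,\ell}\le 1$ is exactly what is needed.
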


\subsubsection{Gradient Lemma}
\begin{lemma}\label{lem-s22-gd1-rec}
    If \Cref{induction-s22-rec} holds for all iterations $t\in[T_{k-1}, T_{k})$,  given $s\in\tau(\X)$,   we have
    \begin{align*}
        &\Big[-\nabla_{\Q^{(t)}_{4,3}}{\Loss^{2,2}_{5}}\Big]_{s,s}+ \Big[-\nabla_{\Q^{(t)}_{4,4}}{\Loss^{2,2}_{5}}\Big]_{s,s} \\& \geq \Omega(B/d)\E\bigg[\ate^{L_k}\cdot  (1-\logit^{(t)}_{5,j_2}) \big| \tau(x_1)=s, \cE_{L_k,1}\cap\cE_{L_k,2}^c\bigg].
    \end{align*}
    

\end{lemma}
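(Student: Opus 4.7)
My approach is to adapt the gradient-lower-bound argument used in Lemma~\ref{lem-s22-gd1-non} for the $L=2$ case, tracking carefully how the new factor $\ate^{L_k}$ enters once we extend to $L_k$ predicate and $L_k$ answer clauses. The starting point is the decomposition in Lemma~\ref{lem-grad-decompositions-rec}, which writes the sum of gradients as $\E[\cN_{s,L_k,\mathrm{I}}+\cN_{s,L_k,\mathrm{II}}+\cN_{s,L_k,\mathrm{III}}]$. The third term is immediately bounded by $\tilde O(\sigma_0^q)=1/\poly(d)$ and drops out. I will then condition on $\cE_{L_k,1}\cap\cE_{L_k,2}^{c}\cap\cE_{L_k,3}$, extracting the main contribution from this high-probability event and showing that the remaining events contribute at most a negligible correction.

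On the conditioning event, Lemma~\ref{lem-s22-lambda-1-rec} gives a clean linear-regime expression for $\Lambda^{(t)}_{5,j_2,r_{g_2\cdot y_1}}$, and the $\cE_{L_k,1}$-bound controls the number of predicate slots $\ell\neq 2$ for which $g_\ell(y_1)=g_2(y_1)$ by $U_k L_k=\Theta(L_k)$. Using Induction~\ref{induction-s22-rec}(3) (so that $\attn_{\ans,1\to\ans,1}\ge \attn_{\ans,1\to\pred,2}-c_1$, hence the non-target answer-clause mass is at least $\tfrac{L_k-1}{L_k}\ate^{L_k}$ up to the small gap), I will show that the coefficient inside $\cN_{s,L_k,\mathrm{I}}$ multiplying $(1-\logit_{5,j_2})$ has a dominant $+\,\Omega(\ate^{L_k})\cdot 2B$ from the $-\attn_{\ans,1\to\ans,0}V_{j_2,r_{g_2\cdot y_1}}(y_0)$ and $-\sum_{\ell\neq 2}\attn_{\ans,1\to\pred,\ell}V_{j_2,r_{g_2\cdot y_1}}(g_\ell)$ terms after pairing them against $(1-\attn_{\ans,1\to\ans,1}-\attn_{\ans,1\to\pred,2})\Lambda^{(t)}_{5,j_2,r_{g_2\cdot y_1}}$, exactly in the spirit of the $\cE_1$-analysis of Lemma~\ref{lem-s22-gd1-non} but with the refined factor $\ate^{L_k}$ appearing because more than one clause now sits outside $\{(\pred,2),(\ans,1)\}$.

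For $\cN_{s,L_k,\mathrm{II}}$, I will reuse the cancellation trick from the $L=2$ proof: writing each $V_{j,r}(g)+V_{j,r}(y')\le O(\delta)$ type relation from Lemma~\ref{lem-prop-psi-sym}, the $\attn_{\ans,1\to\ans,0}V(y_0)+\attn_{\ans,1\to\pred,1}V(g_1)$ contributions collapse against each other, and the remaining residual is controlled by $-(1-\attn_{\ans,1\to\ans,1}-\attn_{\ans,1\to\pred,2})\Lambda^{(t)}_{5,j,r}$ times $\sum_{j\neq j_2}\logit_{5,j}\le 1-\logit_{5,j_2}$. Combined with the bound $\max_j\Lambda^{(t)}_{5,j,r_{g_2\cdot y}}\le \tO(B/n_y)$ away from $j_2$ (which uses both $\cE_{L_k,3}$ and the feature-shape result, since no other class enjoys more than $W_k$ distractor hits), this residual is of order $O(B/n_y)\cdot(1-\logit_{5,j_2})$, strictly smaller than the $\Omega(\ate^{L_k}B)$ gain because $\ate^{L_k}\ge \tfrac{4}{3}c_1=\Theta(\log d/B)\gg 1/n_y$ under the induction. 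The factor $1/d$ appears from conditioning on $\tau(x_1)=s$.

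The main obstacle, and the step that forces the conditioning on $\cE_{L_k,1}\cap\cE_{L_k,2}^{c}$, is controlling the interaction among the many distractor predicates: unlike the $L=2$ setting where only a single $g_1$ can interfere, here up to $\Theta(L_k)$ predicate clauses could in principle align with $g_2$ on $y_1$ or on $y_0$ and artificially inflate $\Lambda^{(t)}_{5,j_2,r_{g_2\cdot y_1}}$, collapsing the $\ate^{L_k}$ gain. The event $\cE_{L_k,1}$ caps this alignment, while $\cE_{L_k,3}$ caps the corresponding inflation at the competing classes; the complementary events have probability $n_y^{-\Omega(1)}$, which I will absorb into the overall $\Omega(\ate^{L_k}B/d)$ bound. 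Once these distractor counts are under control, the remaining book-keeping is parallel to Lemma~\ref{lem-s22-gd1-non}, and the claimed inequality follows.
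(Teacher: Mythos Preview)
Your plan is essentially the same as the paper's proof: both start from the decomposition in Lemma~\ref{lem-grad-decompositions-rec}, discard $\cN_{s,L_k,\rom3}$, extract the main $\Omega(\ate^{L_k}B)$ contribution from $\cN_{s,L_k,\rom1}$ on the event $\cE_{L_k,1}\cap\cE_{L_k,2}^c$ (using $U_k$ to cap the distractor alignment so that $\Lambda^{(t)}_{5,j_2,r_{g_2\cdot y_1}}\le B$), control $\cN_{s,L_k,\rom2}$ via the feature-cancellation identities, and absorb the complementary event via its small probability.

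The one substantive difference is how you handle the competing classes $j=\tau(g_2(y))$, $y\neq y_0,y_1$, in $\cN_{s,L_k,\rom2}$. You invoke $\cE_{L_k,3}$ to bound $\max_j\Lambda^{(t)}_{5,j,r_{g_2\cdot y}}$ uniformly, whereas the paper avoids $\cE_{L_k,3}$ here and instead argues by comparison: if no $\ell\neq 2$ hits $g_\ell(y)=g_2(y)$, then there exists $y'$ with a larger $\Lambda$, forcing $\logit_{5,\tau(g_2(y))}\le O(1/n_y)(1-\logit_{5,j_2})$ by pigeonhole over the softmax. Both routes work, but be careful that your stated bound $\max_j\Lambda\le\tilde O(B/n_y)$ is not literally correct in all regimes (for $y=y_0$ the activation can be $\Theta(B)$ if many $g_\ell(y_0)=g_2(y_0)$); what you actually need, and what $\cE_{L_k,3}$ delivers, is that the residual $-(1-\attn_{\ans,1}-\attn_{\pred,2})\Lambda$ term is $o(\ate^{L_k}B)$ after multiplying by the logit weights, which holds because $W_k/L_k=o(1)$ in the long-context regime.
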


\begin{proof}  
    By \Cref{induction-s22-rec}, \Cref{lem-s22-lambda-1-rec}, and \Cref{lem-s22-lambda-2-rec}, we have 
    \begin{align}
        &\E\Big[ \cN^{(t)}_{s,L_k,\rom1} \Big]\notag\\
        &~~=\E\Big[ (1-\logit^{(t)}_{5,j_2})\cdot 
        \Big( -\sum_{\ell\neq 2}\attn^{(t)}_{{\ans,1} \rightarrow \pred,\ell}\cdot V_{j_2, r_{g_2\cdot y_1}}(g_{\ell})
               -\attn^{(t)}_{{\ans,1} \rightarrow \ans,0}\cdot V_{j_2, r_{g_2\cdot y_1}}(y_0)\notag\\
        &\quad
               +\big(1-\attn^{(t)}_{{\ans,1} \rightarrow \ans,1}-\attn^{(t)}_{{\ans,1} \rightarrow \pred,2}\big)\Lambda^{(t)}_{5,j_2, r_{g_2\cdot y_1}}
               \pm\tilde{O}(\sigma_0)\Big)\pm \tilde{O}(\delta^{q}) \Big]\\
               &~~~~\1_{\tau(x_1)=s}\1_{\Lambda^{(t)}_{5,j_2,r_{g_2 \cdot y_1}}\leq  B}.\notag
    \end{align}
    Similarly,
    \begin{align}
        &\E\Big[ \cN^{(t)}_{s,L_k,\rom2}\Big]\notag\\
        &~~=\E\Big[ \sum_{y\neq y_1}\logit^{(t)}_{5,\tau(g_2(y))}\cdot \ReLU^{\prime}(\Lambda^{(t)}_{5,\tau(g_2(y)), r_{g_2\cdot y}})\notag\\
        &\qquad\qquad \Big( \sum_{\ell\neq 2}\attn^{(t)}_{{\ans,1} \rightarrow \pred,\ell}\cdot V_{\tau(g_2(y)), r_{g_2\cdot y}}(g_{\ell})
               +\attn^{(t)}_{{\ans,1} \rightarrow \ans,0}\cdot V_{\tau(g_2(y)), r_{g_2\cdot y}}(y_0)\notag\\
        &\qquad\qquad\qquad
               -\big(1-\attn^{(t)}_{{\ans,1} \rightarrow \ans,1}-\attn^{(t)}_{{\ans,1} \rightarrow \pred,2}\big)\Lambda^{(t)}_{5,\tau(g_2(y)), r_{g_2\cdot y}}
               \pm\tilde{O}(\sigma_0)\Big)\pm \tilde{O}(\delta^{q}) \Big]\1_{\tau(x_1)=s}.\notag
    \end{align}
    
    Consider the event $\cE_{L_{k},1}$. 
    For $\Zb^{L_k,1}\in  \cE_{L_k,1} \cap \cE^c_{L_k,2}$, we have $\Lambda^{(t)}_{5,j_2,r_{g_2 \cdot y_1}}\leq  B$, since 
    \begin{align*}
        \Lambda^{(t)}_{5,j_2,r_{g_2 \cdot y_1}}
        &=\sum_{\ell\neq 2}\attn^{(t)}_{{\ans,1} \rightarrow \pred,\ell}\cdot V_{j_2, r_{g_2\cdot y_1}}(g_{\ell})\1_{g_{\ell}(y_1)=j_2}
          -\attn^{(t)}_{{\ans,1} \rightarrow \ans,0}\cdot V_{j_2, r_{g_2\cdot y_1}}(y_0)\\
        &\leq 2\Big(\attn^{(t)}_{{\ans,1} \rightarrow \pred,2} +(U_k-1/L_k)\cdot \ate^{L_k}\Big)B\\
        &\leq 2\Big(\tfrac{1-\ate^{L_k}+c_1}{2} +(U_k-1/L_k)\cdot \ate^{L_k}\Big)B\\
        &=2\Big(\tfrac{1+c_1}{2} -(\tfrac{1}{2}+1/L_k-U_k)\cdot \ate^{L_k}\Big)B.
    \end{align*}
    Thus, provided $\ate^{L_k}\geq 4c_1/3$ and choosing $U_k=  \left \lfloor{1+\frac{L_k}{8}}\right \rfloor /L_k$, we indeed have $\Lambda^{(t)}_{5,j_2,r_{g_2 \cdot y_1}}\le B$.
    
    \begin{itemize}
    \item For $j=\tau(g_2(y))$ in $\cN^{(t)}_{s,L_k,\rom2}$:
        \begin{itemize}
            \item $y=y_0\neq y_1$,
            \begin{align}
                &\sum_{\ell\neq 2}\attn^{(t)}_{{\ans,1} \rightarrow \pred,\ell}\cdot V_{\tau(g_2(y)), r_{g_2\cdot y}}(g_{\ell})
                  +\attn^{(t)}_{{\ans,1} \rightarrow \ans,0}\cdot V_{\tau(g_2(y)), r_{g_2\cdot y}}(y_0)\notag\\
                &\geq \sum_{\ell\neq 2}\attn^{(t)}_{{\ans,1} \rightarrow \pred,\ell}\cdot V_{\tau(g_2(y)), r_{g_2\cdot y}}(g_{\ell})\1_{g_{\ell}(y)\neq g_2(y)} \notag\\
                &\geq -O\Big(\tfrac{B}{n_y}\Big)\cdot \big(1-\attn^{(t)}_{{\ans,1} \rightarrow \ans,1}-\attn^{(t)}_{{\ans,1} \rightarrow \pred,2}\big). 
                \label{eq-gd-neg-1-rec}
            \end{align}
            \item  $y\neq y_0, y_1$. If there exists at least one ${\ell}\neq 2$ such that $g_{\ell}(y)=g_{2}(y)$, then by cancellation,
            \begin{align}
                &\sum_{\ell\neq 2}\attn^{(t)}_{{\ans,1} \rightarrow \pred,\ell}\cdot V_{\tau(g_2(y)), r_{g_2\cdot y}}(g_{\ell})
                  +\attn^{(t)}_{{\ans,1} \rightarrow \ans,0}\cdot V_{\tau(g_2(y)), r_{g_2\cdot y}}(y_0) \notag\\
                &\geq -O\Big(\tfrac{B}{n_y}\Big)\cdot \big(1-\attn^{(t)}_{{\ans,1} \rightarrow \ans,1}-\attn^{(t)}_{{\ans,1} \rightarrow \pred,2}\big).
                \label{eq-gd-neg-2-rec} 
            \end{align}
            Else, pick $y'\neq y,y_1$ so that 
            $\Lambda^{(t)}_{5,\tau(g_2(y')), r_{g_2\cdot y'}}\geq  \Lambda^{(t)}_{5,\tau(g_2(y)), r_{g_2\cdot y}}$. 
            Then 
            \[
                \logit_{5,\tau(g_2(y))}\leq  O\Big(\tfrac{1}{n_y}\Big)\big(1-\logit_{5,j_2}\big),
            \]
            which implies 
            \begin{align}
                &\logit^{(t)}_{5,j}\Big(\sum_{\ell\neq 2}\attn^{(t)}_{{\ans,1} \rightarrow \pred,\ell}\cdot V_{\tau(g_2(y)), r_{g_2\cdot y}}(g_{\ell})
                  +\attn^{(t)}_{{\ans,1} \rightarrow \ans,0}\cdot V_{\tau(g_2(y)), r_{g_2\cdot y}}(y_0)\Big)\notag\\
                &\geq -O\Big(\tfrac{B}{L_k n_y}\Big)\cdot \big(1-\logit_{5,j_2}\big)\cdot \big(1-\attn^{(t)}_{{\ans,1} \rightarrow \ans,1}-\attn^{(t)}_{{\ans,1} \rightarrow \pred,2}\big).
                \label{eq-gd-neg-3-rec} 
            \end{align}
        \end{itemize} 
    \item For $\cN^{(t)}_{s, L_k,\rom1}$:
        \begin{itemize}
            \item If $\Zb^{L_k,1}\in \cE_{L_k,1}\cap\cE^c_{L_k,2}$, then
            \begin{align}
                & -\sum_{\ell\neq 2}\attn^{(t)}_{{\ans,1} \rightarrow \pred,\ell}\cdot V_{j_2, r_{g_2\cdot y_1}}(g_{\ell})
                  -\attn^{(t)}_{{\ans,1} \rightarrow \ans,0}\cdot V_{j_2, r_{g_2\cdot y_1}}(y_0)\notag\\
                &\qquad +\big(1-\attn^{(t)}_{{\ans,1} \rightarrow \ans,1}-\attn^{(t)}_{{\ans,1} \rightarrow \pred,2}\big)\Lambda^{(t)}_{5,j_2, r_{g_2\cdot y_1}}\notag\\
                &\geq -\sum_{\ell\neq 2}\attn^{(t)}_{{\ans,1} \rightarrow \pred,\ell}\1_{g_{\ell}=j_2}\cdot V_{j_2, r_{g_2\cdot y_1}}(g_{\ell})
                     -\attn^{(t)}_{{\ans,1} \rightarrow \ans,0}\cdot V_{j_2, r_{g_2\cdot y_1}}(y_0)\notag\\
                &\qquad +\big(1-\attn^{(t)}_{{\ans,1} \rightarrow \ans,1}-\attn^{(t)}_{{\ans,1} \rightarrow \pred,2}\big)\Lambda^{(t)}_{5,j_2, r_{g_2\cdot y_1}}\notag\\
                &\geq \Big(-(U_k-\tfrac{1}{L_k})\cdot 2B+\Lambda^{(t)}_{5,j_2, r_{g_2\cdot y_1}}\Big)\cdot \ate^{L_k}
                \;\;\geq\;\; \big(\Lambda^{(t)}_{5,j_2, r_{g_2\cdot y_1}}-\tfrac{B}{4}\big)\cdot \ate^{L_k}.
                \label{eq-gd-pos-1-rec}
            \end{align}
            \item Else, for $\Zb^{L_k,1}\in \cE^c_{L_k,1}\cup\cE_{L_k,2}$, note that having more identical predictions $g_{\ell}(y_1)=j_2$ or less distraction (i.e., $y_0=y_1$) increases $\Lambda^{(t)}_{5,j_2, r_{g_2\cdot y_1}}$. Hence
            \[
                \E\big[(1-\logit_{5,j_2})\mid \cE^c_{L_k,1}\cup\cE_{L_k,2}\big]
                \;\le\; O(1)\cdot \E\big[(1-\logit_{5,j_2})\mid \cE_{L_k,1}\cap\cE^c_{L_k,2}\big],
            \]
            and
            \begin{align}
                &\Big|-\sum_{\ell\neq 2}\attn^{(t)}_{{\ans,1} \rightarrow \pred,\ell}\cdot V_{j_2, r_{g_2\cdot y_1}}(g_{\ell})
                       -\attn^{(t)}_{{\ans,1} \rightarrow \ans,0}\cdot V_{j_2, r_{g_2\cdot y_1}}(y_0)\notag\\
                &\qquad +\big(1-\attn^{(t)}_{{\ans,1} \rightarrow \ans,1}-\attn^{(t)}_{{\ans,1} \rightarrow \pred,2}\big)\Lambda^{(t)}_{5,j_2, r_{g_2\cdot y_1}}\Big|
                \;\leq\; O(1)\cdot \ate^{L_k}\cdot B.
                \label{eq-gd-pos-2-rec}
            \end{align}
        \end{itemize}
    \end{itemize}
    
    Putting the above together, we have 
    \begin{align*}
        &\Big[-\nabla_{\Q^{(t)}_{4,3}}{\Loss^{2,2}_{5}}\Big]_{s,s}+ \Big[-\nabla_{\Q^{(t)}_{4,4}}{\Loss^{2,2}_{5}}\Big]_{s,s} \\
        &=\E\Big[ \cN^{(t)}_{s,L_k,1} \1_{\cE_{L_k,1}\cap\cE_{L_k,2}^c}\Big]
          +\E\Big[ \cN^{(t)}_{s,L_k,1} \1_{\Lambda^{(t)}_{5,j_2, r_{g_2\cdot y_1}}\leq B}\1_{\cE^c_{L_k,1}\cup\cE_{L_k,2}}\Big]\\
        &\qquad +\E\Big[ \cN^{(t)}_{s,L_k,2} \Big]\pm \tilde{O}(\sigma_0^q)\\
        &=\E\Big[ (\cN^{(t)}_{s,L_k,1} +\cN^{(t)}_{s,L_k,2})\1_{\cE_{L_k,1}\cap\cE_{L_k,2}^c}\Big]\\
        &\qquad +\E\Big[ \cN^{(t)}_{s,L_k,1} \1_{\Lambda^{(t)}_{5,j_2, r_{g_2\cdot y_1}}\leq B}\1_{\cE^c_{L_k,1}\cup\cE_{L_k,2}}\Big]
                   +\E\Big[ \cN^{(t)}_{s,L_k,2} \1_{\cE^c_{L_k,1}\cup\cE_{L_k,2}}\Big]\pm \tilde{O}(\sigma_0^q).
    \end{align*}
    First, by \eqref{eq-gd-neg-1-rec}, \eqref{eq-gd-neg-2-rec}, \eqref{eq-gd-pos-1-rec},
    \begin{align*}
        &\E\Big[ (\cN^{(t)}_{s,L_k,1} +\cN^{(t)}_{s,L_k,2})\1_{\cE_{L_k,1}\cap\cE_{L_k,2}^c}\Big]\\
        &\geq  \E\Big[ (1-\logit^{(t)}_{5,j_2})\Big(\big(\Lambda^{(t)}_{5,j_2, r_{g_2\cdot y_1}}-\tfrac{B}{4}\big)\ate^{L_k}\Big)
               -(1-\logit^{(t)}_{5,j_2})\Big(O(\tfrac{B}{n_y})\ate^{L_k}+ \max_{y\neq y_1}\Lambda^{(t)}_{5,\tau(g_2(y)), r_{g_2\cdot y}} \Big)\\
        &\hspace{7.5cm}\cdot\1_{\tau(x_1)=s} \1_{\cE_{L_k,1}\cap\cE_{L_k,2}^c}\Big]\\
        &\geq \Omega(B/d)\cdot \E\Big[\ate^{L_k}\cdot  (1-\logit^{(t)}_{5,j_2}) \mid \tau(x_1)=s, \cE_{L_k,1}\cap\cE_{L_k,2}^c\Big]. 
    \end{align*}
    Moreover, by \eqref{eq-gd-pos-2-rec},
    \begin{align*}
        &\E\Big[ \cN^{(t)}_{s,L_k,1} \1_{\Lambda^{(t)}_{5,j_2, r_{g_2\cdot y_1}}\leq B/2}\1_{\cE^c_{L_k,1}\cup\cE_{L_k,2}}\Big]\\
        &\geq - \Pr(\Zb^{L_k,1}\notin \cE^c_{L_k,1})\cdot O(B/d)\cdot 
              \E\Big[\ate^{L_k}\cdot  (1-\logit^{(t)}_{5,j_2}) \mid \tau(x_1)=s, \cE^c_{L_k,1}\cup\cE_{L_k,2}\Big]\\
        &\geq - \Big(O(1/n^{\Omega(1)})+1/n_y\Big)\cdot O(B/d)\cdot 
              \E\Big[\ate^{L_k}\cdot  (1-\logit^{(t)}_{5,j_2}) \mid \tau(x_1)=s, \cE_{L_k,1}\cap\cE_{L_k,2}^c\Big],
    \end{align*}
    and, by \eqref{eq-gd-neg-1-rec}, \eqref{eq-gd-neg-2-rec}, \eqref{eq-gd-neg-3-rec},
    \begin{align*}
       & \E\Big[ \cN^{(t)}_{s,L_k,2} \1_{\cE^c_{L_k,1}\cup\cE_{L_k,2}}\Big]
        \;\geq\; \tfrac{1}{n_y}\cdot O(B/d)\cdot 
                  \E\Big[\ate^{L_k}\cdot  (1-\logit^{(t)}_{5,j_2}) \mid \tau(x_1)=s, \cE^c_{L_k,1}\cup\cE_{L_k,2}\Big].
    \end{align*}
    Therefore,
    \begin{align*}
        &\Big[-\nabla_{\Q^{(t)}_{4,3}}{\Loss^{2,2}_{5}}\Big]_{s,s}+ \Big[-\nabla_{\Q^{(t)}_{4,4}}{\Loss^{2,2}_{5}}\Big]_{s,s} \\
        &\geq \Omega(B/d)\cdot \E\Big[\ate^{L_k}\cdot  (1-\logit^{(t)}_{5,j_2}) \mid \tau(x_1)=s, \cE_{L_k,1}\cap\cE_{L_k,2}^c\Big]. 
    \end{align*}
    \end{proof}

\begin{lemma}\label{lem-s22-gd2-rec}
    If \Cref{induction-s22-rec} holds for all iterations $t\in[T_{k-1}, T_{k})$,  given $s\in\tau(\X)$,   for $[\Qb_{4,p}]_{s,s'}$, $p\in\{3,4\}$, $s'\not=s\in\tau(\X)$,  we have
    \begin{align*}
        \Big|\Big[-\nabla_{\Q^{(t)}_{4,p}}{\Loss^{2,2}_{5}}\Big]_{s,s'}\Big| \leq O(\frac{1}{d}) \Big|\Big[-\nabla_{\Q^{(t)}_{4,p}}{\Loss^{2,2}_{5}}\Big]_{s,s}\Big| .
    \end{align*}
  \end{lemma}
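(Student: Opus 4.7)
The plan is to prove \Cref{lem-s22-gd2-rec} by a direct structural comparison between the off-diagonal gradient expressions in \Cref{lem-refined-grad-Q43-rec}, \Cref{lem-refined-grad-Q44-rec} and their diagonal counterparts, isolating two small factors that together yield the $O(1/d)$ ratio. Concretely, for $p=3$ the off-diagonal $[-\nabla_{\Q_{4,3}}]_{s,s'}$ carries the attention weight $\attn_{\ans,1\to\pred,1}$ and the indicator $\1_{\tau(x_1)=s,\ \tau(x_0)=s'}$, while the diagonal $[-\nabla_{\Q_{4,3}}]_{s,s}$ carries $\attn_{\ans,1\to\pred,2}$ and only $\1_{\tau(x_1)=s}$; the remaining $\Ecal_{5,j}$, $\ReLU'(\Lambda_{5,j,r})$, and $V$ factors differ only through replacement of the token $g_2$ by $g_1$, both of which live in $\cG$, have feature magnitudes bounded by the same $O(B)$, and enter the expression symmetrically. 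The same is true for $p=4$, with $(\ans,1)$ and $(\ans,0)$, and $y_1$ replaced by $y_0$.

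Given this structural matching, the argument reduces to bounding two ratios. First, by the without-replacement sampling of $x_0,x_1$ from $\cX$ (\Cref{assump:lego-data-distribution}) and $|\cX|=\Theta(d)$, we have
\[
\Pr\!\bigl(\tau(x_0)=s' \mid \tau(x_1)=s\bigr)
\;=\;\frac{1}{|\cX|-1}\;=\;O\!\left(\tfrac{1}{d}\right).
\]
Second, \Cref{lem-s22-attn-rec}(2)-(3) guarantees that $\attn_{\ans,1\to\pred,2}$ and $\attn_{\ans,1\to\ans,1}$ are both $\Omega(1)$, while the irrelevant attention weights $\attn_{\ans,1\to\pred,1}$ and $\attn_{\ans,1\to\ans,0}$ are of the same (constant) order as any other irrelevant attention (up to $\tilde{O}(1/d)$), and in particular bounded by $\ate^{L_k}/O(L_k)=O(1)$. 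Hence the attention ratio is $O(1)$ and multiplying by the $O(1/d)$ probability ratio yields the claimed bound.

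To make this clean, I would pair each summand (indexed by $j\in\tau(\cY)$ or $j\notin\tau(\cY)$, and by $r\in\hat{\fA}_j$) in the off-diagonal expression with the identically-indexed summand in the diagonal expression, bound their ratio pointwise by $O(1)$ using the magnitude controls from \Cref{lem-prop-psi-sym}, \Cref{lem-prop-irrelavant-sym}, and \Cref{lem-lambda-char-rec}, and then take the outer expectation, which contributes the additional $O(1/d)$ from the conditional probability. The same template handles $\cN_{\cdot,p,L_k,\mathrm{II}}$ and the $\tilde{O}(\sigma_0^q)$ tail $\cN_{\cdot,p,L_k,\mathrm{III}}$, since both are subdominant to $\cN_{\cdot,p,L_k,\mathrm{I}}$ on the diagonal (this is exactly how \Cref{lem-s22-gd1-rec} lower-bounds the diagonal gradient).

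The main obstacle is minor but worth flagging: one must verify that the $V_{j,r}(g_1)$ (resp.\ $V_{j,r}(y_0)$) appearing in the off-diagonal expression never dominates $V_{j,r}(g_2)$ (resp.\ $V_{j,r}(y_1)$) from the diagonal by more than a constant factor, even in regimes where $\Lambda_{5,j,r}$ is near the boundary $B$ or in the smoothed regime where cancellations between $V(g)$ and $\Lambda$ are delicate. This is handled by \Cref{lem-prop-psi-sym}, which gives $|V_{j,r}(\cdot)|=O(B)$ uniformly, together with \Cref{lem-non-activated-neuron-rec} which zeroes out inactive neurons identically in both expressions; once these are in hand, the pointwise ratio $O(1)$ and the combined bound $O(1/d)$ follow without further calculation.
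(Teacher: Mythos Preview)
Your proposal is correct and follows essentially the same approach as the paper. The paper does not provide an explicit proof for this lemma (nor for the analogous \Cref{lem-s21-grad-2-non}, \Cref{lem-s22-grad-2-non}), treating it as a direct consequence of the refined gradient expressions and the fact that the joint event $\{\tau(x_1)=s,\ \tau(x_0)=s'\}$ has probability $O(1/d^2)$ versus $O(1/d)$ for the diagonal; your structural comparison via \Cref{lem-refined-grad-Q43-rec}, \Cref{lem-refined-grad-Q44-rec} and the attention/feature bounds from \Cref{lem-s22-attn-rec}, \Cref{lem-prop-psi-sym} is exactly the intended argument, spelled out in more detail than the paper itself does.
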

  \subsubsection{Attention gap is small}
  \begin{lemma}\label{lem-s22-attention-gap-1-rec}
      If \Cref{induction-s22-rec} holds for all iterations $t\in [T_{k-1}, T_{k})$, then for any sample $\Zb^{L_k,1}$, we have 
     \begin{align*}
       \attn^{(t)}_{{\ans,1} \rightarrow \pred,2} - \attn^{(t)}_{{\ans,1} \rightarrow \ans,1}\leq c_1,
     \end{align*}
     where $c_1>0$ is a small constant.
   \end{lemma}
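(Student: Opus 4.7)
The plan is to mirror the contradiction argument used for \Cref{lem-s22-attention-gap-1-non} (and ultimately \Cref{lem-s21-attention-gap-non}), adapted to the longer context $L_k$ and the self-training loss. Suppose for contradiction that $\tilde T \in [T_{k-1},T_k)$ is the first iteration at which $\mathbb{E}\big[\attn^{(t)}_{\ans,1\to\pred,2}-\attn^{(t)}_{\ans,1\to\ans,1}\mid \tau(x_1)=s\big]\geq c_1 = \tfrac{1.005\log d}{2B}$. By \Cref{induction-s22-rec}(d) the off-diagonal entries of $\Qb_{4,3}^{(\tilde T)}$ and $\Qb_{4,4}^{(\tilde T)}$ are $\tilde O(1/d)$, so for every sample with $\tau(x_1)=s$ the pointwise gap equals its conditional expectation up to $\tilde O(1/d)$. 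I will then show
\[
\big[-\nabla_{\Qb_{4,4}}\Loss^{L_k,2}_{F^{(T_{k-1})},5}\big]_{s,s} \;>\; \big[-\nabla_{\Qb_{4,3}}\Loss^{L_k,2}_{F^{(T_{k-1})},5}\big]_{s,s},
\]
which forces the gap to strictly shrink at $\tilde T+1$ and contradicts the definition of $\tilde T$.

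To produce the inequality, I would first invoke \Cref{fact:gradients-Q-sym-recursive} so that the self-training gradient coincides with the ground-truth gradient, and then use the refined expressions in \Cref{lem-refined-grad-Q43-rec,lem-refined-grad-Q44-rec,lem-grad-sum-rec}. Restricting the expectation to the main event $\cE_{L_k,1}\cap\cE_{L_k,2}^c$, \Cref{lem-s22-lambda-1-rec} places $\Lambda_{5,j_2,r_{g_2\cdot y_1}}$ in the linear regime, while the competing pre-activation $\Lambda_{5,j'_2,r_{g_2\cdot y_0}}$ at $j'_2=\tau(g_2(y_0))$ is also controlled and satisfies $\Lambda_{5,j_2,r_{g_2\cdot y_1}}-\Lambda_{5,j'_2,r_{g_2\cdot y_0}}\le 2c_1B+O(\delta)=1.005\log d+O(\delta)$ at $\tilde T$. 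Hence $\logit^{(\tilde T)}_{5,j'_2}\ge \Omega(d^{-0.005})(1-\logit^{(\tilde T)}_{5,j_2})$. Invoking the feature identities from \Cref{lem-prop-psi-sym} — the dominant magnitude $V_{j_2,r_{g_2\cdot y_1}}(g_2)+V_{j_2,r_{g_2\cdot y_1}}(y_1)\ge 2B-O(\delta)$ together with the cancellations $V_{j'_2,r_{g_2\cdot y_0}}(g_2)+V_{j'_2,r_{g_2\cdot y_0}}(y_1)=O(\delta)$ and $V_{j'_2,r_{g_2\cdot y_0}}(g_1)+V_{j'_2,r_{g_2\cdot y_0}}(y_1)=O(\delta)$ — the incorrect class $j'_2$ contributes a strictly negative term of size $\Omega(B/d)\cdot \mathbb{E}[(1-\logit_{5,j_2})\mid\tau(x_1)=s,\cE_{L_k,1}\cap\cE_{L_k,2}^c]$ to the $\Qb_{4,3}$ gradient, whereas the corresponding contribution on the $\Qb_{4,4}$ side is sign-positive (because $V_{j'_2,r_{g_2\cdot y_0}}(y_0)\approx 2B$). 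The off-main events $\cE_{L_k,1}^c\cup\cE_{L_k,2}$ contribute only lower-order terms by the balls-into-bins tail bounds listed after the definition of $W_k$.

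The main obstacle will be controlling the $L_k-1$ many distractor predicate clauses, which have no analog in the $L_k=2$ case. Each such clause places small attention mass on its own $\Lambda_{5,\tau(g_\ell(y)),r}$ and can in principle contribute negative gradient to $[\Qb_{4,4}]_{s,s}$ or positive gradient to $[\Qb_{4,3}]_{s,s}$, in the wrong direction for the contradiction. To manage this I will (i) use $\cE_{L_k,1}$ to cap the fraction of predicate clauses mimicking $g_2$ at $y_1$ by $U_k=\tfrac{1}{L_k}\lceil 1+L_k/8\rceil$, (ii) use $\cE_{L_k,3}$ to cap the worst-case coincidence $\max_y\sum_{\ell\ne 2}\mathbf{1}\{g_\ell(y)=g_2(y)\}$ by $W_k$, and (iii) track the asymmetry $C_\alpha=\Theta(n_y)$ from \Cref{thm:learning-symmetric-actions} between $V(g)$ and $V(y)$ so that the $\Qb_{4,3}$/$\Qb_{4,4}$ comparison is not inflated by a spurious $n_y$ factor. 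Combining these bounds with the dominant $j'_2$ term completes the strict inequality and yields the contradiction.
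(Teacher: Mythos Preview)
Your contradiction setup is right, but the core estimate does not go through as written. You anchor the argument on the single competing class $j'_2=\tau(g_2(y_0))$ and claim $\Lambda_{5,j_2,r_{g_2\cdot y_1}}-\Lambda_{5,j'_2,r_{g_2\cdot y_0}}\le 2c_1B$. From \Cref{lem-s22-lambda-1-rec} on $\cE_{L_k,2}^c$ this difference equals $2B\big(\attn^{(\tilde T)}_{\ans,1\to\ans,1}-\attn^{(\tilde T)}_{\ans,1\to\ans,0}\big)$ plus collision and $O(B/n_y)$ corrections; since $\attn_{\ans,1\to\ans,1}=\Omega(1)$ while $\attn_{\ans,1\to\ans,0}=O(\ate^{L_k}/L_k)$, the gap is $\Omega(B)=\Omega(C_B\log d)\gg 2c_1B$. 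So the claimed bound is false, and the deduced weak logit inequality $\logit_{5,j'_2}\ge\Omega(d^{-0.005})(1-\logit_{5,j_2})$ is unsupported. Even if it held, it would be too weak: by the $C_\alpha=\Theta(n_y)$ asymmetry of \Cref{lem-prop-psi-sym}, $V_{j_2,r}(g_2)-V_{j_2,r}(y_1)=\Omega(B)$, so $\cN_{s,3,L_k,\rom 1}-\cN_{s,4,L_k,\rom 1}=\Omega(B)\cdot(1-\logit_{5,j_2})$ already pushes $\Qb_{4,3}$ ahead, and a $j'_2$ contribution scaled by $d^{-0.005}$ cannot reverse the sign.

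The paper's proof avoids this by \emph{not} fixing $j'_2$. Instead it takes $\hat\cJ=\arg\max_{y\neq y_1}\Lambda_{5,\tau(g_2(y)),r_{g_2\cdot y}}$ and observes that at $\tilde T$ this maximum is at least $2(\attn_{\ans,1\to\pred,2}-\attn_{\ans,1\to\ans,1})B\ge 2c_1B>\log d$; hence $\sum_{j\in\hat\cJ}\logit_{5,j}=(1-O(d^{-\Omega(1)}))(1-\logit_{5,j_2})$. With this \emph{full-weight} replacement, the $V(g_2)$ terms at $j_2$ and at the argmax $j'$ cancel (both are $\approx 2B$ up to $O(\delta)$), leaving $\Lambda_{j'}-\Lambda_{j_2}\le 0$ on the $\Qb_{4,3}$ side, while on the $\Qb_{4,4}$ side the extra $-V_{j',r}(y_1)=\Omega(B)$ survives and yields $[-\nabla_{\Qb_{4,4}}]_{s,s}-[-\nabla_{\Qb_{4,3}}]_{s,s}\ge\Omega(B/d)\,\E[1-\logit_{5,j_2}]$. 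This argmax step makes the event machinery you sketch (using $\cE_{L_k,1},\cE_{L_k,3}$ to tame the $L_k-1$ distractors term-by-term) unnecessary here; you only need that the max among $y\neq y_1$ crosses $\log d$, which the assumed gap $\ge c_1$ gives directly.
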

   \begin{proof}
    The proof follows the same high-level idea as \Cref{lem-s21-attention-gap-non}. 
    The key observation is that for each $\Zb^{L_k,1}$, there always exists at least one $y \neq y_1$ such that, 
    for some index $j$ appearing in the gradient term $\cN^{(t)}_{s,L_k,2}$, we have
    \[
       \Lambda^{(t)}_{5,j, r_{g_2\cdot y}}
       \;\geq\;
       2\big(\attn^{(t)}_{{\ans,1} \rightarrow \pred,2}
           - \attn^{(t)}_{{\ans,1} \rightarrow \ans,1}\big)B.
    \]
    Let $\hat{\cJ}$ denote the set of indices $j$ achieving 
    $\arg\max_{y \notin \{y_1\}}\Lambda^{(t)}_{5,j, r_{g_2\cdot y}}$, 
    and pick one such index $j'=g_2(y')$. 
    Define $\tilde{T}$ to be the first time when
    $\attn^{(t)}_{{\ans,1} \rightarrow \pred,2} - \attn^{(t)}_{{\ans,1} \rightarrow \ans,1}\geq c_1$. 
    At this time, we have
    \[
       \sum_{j\in\hat{\cJ}} \logit^{(\tilde{T})}_{5,j}
       \;=\;
       \Big(1 - O(d^{-\Omega(1)})\Big)\big(1-\logit^{(\tilde{T})}_{5,j_2}\big).
    \]
    
    Plugging this into the gradient expressions yields
    \begin{align*}
        & \cN^{(\tilde{T})}_{s,3,L_k,\rom 1}+\cN^{(\tilde{T})}_{s,3,L_k,\rom 2} \\
        &\quad = \attn^{(\tilde{T})}_{{\ans,1} \rightarrow \pred,2} \cdot 
           \bigg( \Big(1-O(d^{-\Omega(1)})\Big)\big(1-\logit^{(\tilde{T})}_{5,j_2}\big)\cdot
                  \Big(\Lambda^{(\tilde{T})}_{5,j',r_{g_2\cdot y'}} - \Lambda^{(\tilde{T})}_{5,j_2,r_{g_2\cdot y_1}}\Big) \\
        &\qquad\quad + O(d^{-\Omega(1)})\big(1-\logit^{(\tilde{T})}_{5,j_2}\big)\cdot
                  \Big(V_{5,j_2,r_{g_2\cdot y_1},2}(g_2)-\Lambda^{(\tilde{T})}_{5,j_2,r_{g_2\cdot y_1}}\Big) \bigg)\1_{\tau(x_1)=s},
    \end{align*}
    and similarly
    \begin{align*}
        & \cN^{(\tilde{T})}_{s,4,L_k,\rom 1}+\cN^{(\tilde{T})}_{s,4,L_k,\rom 2} \\
        &\quad = \attn^{(\tilde{T})}_{{\ans,1} \rightarrow \pred,2} \cdot 
           \bigg( \Big(1-O(d^{-\Omega(1)})\Big)\big(1-\logit^{(\tilde{T})}_{5,j_2}\big)\cdot
                  \Big(-V_{j'_2,r_{g_2\cdot y_0}}(y_1)
                       +\Lambda^{(\tilde{T})}_{5,j',r_{g_2\cdot y'}}
                       -\Lambda^{(\tilde{T})}_{5,j_2,r_{g_2\cdot y_1}}\Big) \\
        &\qquad\quad + O(d^{-\Omega(1)})\big(1-\logit^{(\tilde{T})}_{5,j_2}\big)\cdot
                  \Big(V_{5,j_2,r_{g_2\cdot y_1},2}(y_1)
                       -\Lambda^{(\tilde{T})}_{5,j_2,r_{g_2\cdot y_1}}\Big) \bigg)\1_{\tau(x_1)=s}.
    \end{align*}
    
    Since $\Lambda^{(\tilde{T})}_{5,j',r_{g_2\cdot y'}} - \Lambda^{(\tilde{T})}_{5,j_2,r_{g_2\cdot y_1}} \leq 0$ 
    and 
    \[
       -V_{j_2,r_{g_2\cdot y_1}}(y_1)
       -\Lambda^{(\tilde{T})}_{5,j_2,r_{g_2\cdot y_1}}
       +\Lambda^{(\tilde{T})}_{5,j_2,r_{g_2\cdot y'  }}
       \;\geq\; \Omega(B),
    \]
    we obtain
    \[
       \Big[-\nabla_{\Q^{(\tilde{T})}_{4,4}}\Loss_{5}^{2,2}\Big]_{s,s}
       + \Big[\nabla_{\Q^{(\tilde{T})}_{4,3}}\Loss_{5}^{2,2}\Big]_{s,s}
       \;\;\geq\;\; \Omega\!\left(\tfrac{B}{d}\right)\cdot 
       \E\big[1-\logit^{(\tilde{T})}_{5,j_2}\;\big|\;\tau(x_1)=s\big].
    \]
    
    Therefore, the quantity 
    $\attn^{(t)}_{{\ans,1} \rightarrow \pred,2} - \attn^{(t)}_{{\ans,1} \rightarrow \ans,1}$ 
    cannot further increase, and we conclude that
    \[
       \attn^{(t)}_{{\ans,1} \rightarrow \pred,2}
       - \attn^{(t)}_{{\ans,1} \rightarrow \ans,1}
       \;\leq\; c_1.
    \]
    \end{proof}

   \begin{lemma}\label{lem-s22-attention-gap-2-rec}
      If \Cref{induction-s22-rec} holds for all iterations $t\in [T_{k-1}, T_{k})$, then for any sample $\Zb^{L_k,1}$, we have 
     \begin{align*}
      \attn^{(t)}_{{\ans,1} \rightarrow \ans,1}-  \attn^{(t)}_{{\ans,1} \rightarrow \pred,2} \leq \frac{L_k-1}{L_k}\ate^{L_k}-c_2,
     \end{align*}
     where $c_2=\frac{\log d }{2B}>0$ is a small constant.
   \end{lemma}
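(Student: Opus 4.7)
The plan is to mimic the structure of \Cref{lem-s22-attention-gap-2-non} from the short-length case, but replacing the ``two competing pred clauses'' counting argument by a $W_k$-controlled balls-into-bins bound from the event $\cE_{L_k,3}$. Specifically, I would argue by contradiction: let $\tilde T$ be the first iteration at which $\E[\attn^{(t)}_{\ans,1\to\ans,1}-\attn^{(t)}_{\ans,1\to\pred,2}\mid\tau(x_1)=s]>\tfrac{L_k-1}{L_k}\ea^{L_k}-c_2$. Using item~(c) of \Cref{induction-s22-rec} (the off-diagonals of $\Qb$ are $\tilde O(1/d)$), this gap assumption transfers, up to $\tilde O(1/d)$, to essentially every sample $\Zb^{L_k,1}$ with $\tau(x_1)=s$. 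Using item~3 of \Cref{lem-s22-attn-rec}, all irrelevant-clause attentions are within $\tilde O(1/d)$ of one another, so they are (pointwise) approximately $\tfrac{1}{L_k}\ea^{L_k}$ each.

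The key step is to identify the wrong class that absorbs the excess mass and drives the gradient. I would use the $\Lambda$-formulas from \Cref{lem-s22-lambda-1-rec,lem-s22-lambda-2-rec}: for $\Zb^{L_k,1}\notin\cE_{L_k,2}$ and $j=\tau(g_2(y_0))$ (so $j=j_2'$),
\[
\Lambda^{(\tilde T)}_{5,j_2',r_{g_2\cdot y_0}}
=\Big(\sum_{\ell\in[L_k]}\1_{g_\ell(y_0)=g_2(y_0)}\,\attn^{(\tilde T)}_{\ans,1\to\pred,\ell}-\attn^{(\tilde T)}_{\ans,1\to\ans,1}\Big)\cdot 2B+O(B/n_y)+O(\delta),
\]
while
\[
\Lambda^{(\tilde T)}_{5,j_2,r_{g_2\cdot y_1}}
=\Big(\attn^{(\tilde T)}_{\ans,1\to\pred,2}+\sum_{\ell\ne 2}\1_{g_\ell(y_1)=g_2(y_1)}\,\attn^{(\tilde T)}_{\ans,1\to\pred,\ell}-\attn^{(\tilde T)}_{\ans,1\to\ans,0}\Big)\cdot 2B+\cdots.
\]
Under the good event $\cE_{L_k,1}\cap\cE_{L_k,3}\cap\cE_{L_k,2}^c$ (which has probability $1-n_y^{-\Omega(1)}$ by the balls-into-bins bounds recorded before the lemma), the number of predicate clauses agreeing with $g_2$ on $y_0$ is at most $W_k=o(L_k)$, so these ``distractor mass'' contributions are at most $W_k\cdot\tfrac{1}{L_k}\ea^{L_k}\cdot 2B=o(\ea^{L_k}B)$. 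Plugging the assumed gap inequality then gives $\Lambda^{(\tilde T)}_{5,j_2',r_{g_2\cdot y_0}}\ge\Lambda^{(\tilde T)}_{5,j_2,r_{g_2\cdot y_1}}+c_2\cdot 2B-o(\log d)\gtrsim \tfrac{\log d}{2}$, so that $\logit^{(\tilde T)}_{5,j_2'}\ge\Omega(d^{-\,\text{const}})\cdot(1-\logit^{(\tilde T)}_{5,j_2})$ on this event.

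Given this, I would repeat the gradient comparison of the short-length case (\Cref{lem-s22-attention-gap-2-non}) on each $\cE_{L_k,i}$, using the decomposition into $\cN_{s,3,L_k,\cdot}$ and $\cN_{s,4,L_k,\cdot}$ from \Cref{lem-grad-decompositions-rec}. On the dominant event $\cE_{L_k,1}\cap\cE_{L_k,3}\cap\cE_{L_k,2}^c$, the $j_2'$-logit multiplies $V_{j_2',r_{g_2\cdot y_0}}(g_2)>0$ in $\cN_{s,3,L_k,\Rom{2}}$ and $V_{j_2',r_{g_2\cdot y_0}}(y_1)<0$ in $\cN_{s,4,L_k,\Rom{2}}$; after accounting for the $j_2$-term (cancellations similar to those in \Cref{lem-s22-gd1-rec}) and for the $\attn_{\ans,1\to\pred,2}$ vs.\ $\attn_{\ans,1\to\ans,1}$ prefactors, this produces
\[
\Big[-\nabla_{\Qb_{4,3}}\Loss^{L_k,2}_{5}\Big]_{s,s}-\Big[-\nabla_{\Qb_{4,4}}\Loss^{L_k,2}_{5}\Big]_{s,s}
\;\ge\;\Omega(B/d)\cdot\E[(1-\logit^{(\tilde T)}_{5,j_2})\mid\tau(x_1)=s,\cE_{L_k,1}\cap\cE_{L_k,3}\cap\cE_{L_k,2}^c],
\]
with the complementary events contributing a negligible $\tilde O(1/n_y^{\Omega(1)})$ fraction thanks to their probability bounds and the naive $O(B)$ pointwise gradient bound. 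Translated to the attention softmax, the update at $\tilde T+1$ makes $\attn_{\ans,1\to\pred,2}$ grow faster than $\attn_{\ans,1\to\ans,1}$, so the quantity $\attn^{(t)}_{\ans,1\to\ans,1}-\attn^{(t)}_{\ans,1\to\pred,2}$ cannot further exceed $\tfrac{L_k-1}{L_k}\ea^{L_k}-c_2$, closing the induction.

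The main obstacle is controlling the rare but not-negligible event where many predicate clauses happen to agree with $g_2$ on some particular $y$, because this can momentarily inflate $\Lambda^{(\tilde T)}_{5,j,r_{g_2\cdot y}}$ far above the benign scaling $\tfrac{W_k}{L_k}\ea^{L_k}\cdot 2B$ and flip the sign of the $\cN_{s,\cdot,L_k,\Rom{2}}$ terms. My plan for this is the same device as in \Cref{lem-s22-gd1-rec}: intersect with $\cE_{L_k,3}$ to cap the per-class agreement count by $W_k$, show that on $\cE_{L_k,3}^c$ the number of such samples is at most $n_y^{-\Omega(1)}$, and on that small-probability event use only the trivial bound $|\text{gradient}|\lesssim B$, so the contribution is lower-order. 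The choice of $c_2=\tfrac{\log d}{4B}$ is precisely calibrated so that the $W_k\cdot\tfrac{1}{L_k}\ea^{L_k}$ distractor mass, plus the $O(B/n_y)$ errors from \Cref{lem-s22-lambda-1-rec,lem-s22-lambda-2-rec}, plus the $\tilde O(1/d)$ off-diagonal leakage, all fit strictly inside the $c_2$ buffer, yielding the claimed strict inequality.
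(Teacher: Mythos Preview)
Your central inequality has the wrong sign. You claim that at the violation time $\tilde T$ one has $\Lambda^{(\tilde T)}_{5,j_2',r_{g_2\cdot y_0}}\ge\Lambda^{(\tilde T)}_{5,j_2,r_{g_2\cdot y_1}}+c_2\cdot 2B-o(\log d)$, but in fact the opposite holds. From \Cref{lem-s22-lambda-1-rec}, $\Lambda_{5,j_2',r_{g_2\cdot y_0}}$ carries a term $-\attn_{\ans,1\to\ans,1}\cdot 2B$ while $\Lambda_{5,j_2,r_{g_2\cdot y_1}}$ carries only $-\attn_{\ans,1\to\ans,0}\cdot 2B$; since $\attn_{\ans,1\to\ans,1}=\Omega(1)$ and $\attn_{\ans,1\to\ans,0}=O(\ate^{L_k}/L_k)$, the difference $\Lambda_{j_2'}-\Lambda_{j_2}$ is $-\Omega(B)$, not positive. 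The violation condition makes $\attn_{\ans,1\to\ans,1}$ \emph{large}, which drives all wrong-class $\Lambda$'s \emph{down}. This is exactly the mechanism the paper exploits: at $\tilde T$ every wrong $\Lambda_{5,\tau(g_2(y)),r_{g_2\cdot y}}$ is bounded above by roughly $c_2\cdot 2B\le\tfrac{\log d}{2}$ (using only the trivial bound $\sum_{\ell\ne 2}\attn_{\ans,1\to\pred,\ell}\1_{g_\ell(y)=g_2(y)}\le\tfrac{L_k-1}{L_k}\ate^{L_k}$, so no balls-into-bins argument is needed here). Hence all wrong logits are $O(d^{-1/2})(1-\logit_{5,j_2})$, and the gradient is dominated by the $\cN_{\cdot,\rom1}$ terms coming from the \emph{correct} prediction $j_2$, not by $\cN_{\cdot,\rom2}$.

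Even if your $\Lambda$ inequality held, your sign analysis of the $\cN_{\rom2}$ contribution is also backwards. With $V_{j_2',r_{g_2\cdot y_0}}(g_2)>0$ and $V_{j_2',r_{g_2\cdot y_0}}(y_1)<0$, and recalling the leading minus sign in the definition of $\cN_{s,p,L_k,\rom2}$, a large $\logit_{5,j_2'}$ would push $[\Qb_{4,3}]_{s,s}$ \emph{down} and $[\Qb_{4,4}]_{s,s}$ \emph{up}, widening rather than closing the gap. The restoring force actually comes from $\cN_{\rom1}$: since $V_{j_2,r_{g_2\cdot y_1}}(g_2)\approx 2B>\Lambda_{5,j_2,r_{g_2\cdot y_1}}$ while $V_{j_2,r_{g_2\cdot y_1}}(y_1)\approx 2B/n_y<\Lambda_{5,j_2,r_{g_2\cdot y_1}}$, the $(1-\logit_{5,j_2})$-weighted terms give $[-\nabla_{\Qb_{4,3}}]_{s,s}\ge\Omega(B/d)\,\E[1-\logit_{5,j_2}]$ and $[-\nabla_{\Qb_{4,4}}]_{s,s}\le -\Omega(B/d)\,\E[1-\logit_{5,j_2}]$, which is what closes the induction.
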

   \begin{proof}
    The argument parallels \Cref{lem-s22-attention-gap-2-non}, but in the recursive setting we first note that for every $\Zb^{L_k,1}\in\cE_{L_k,2}^c$,
    \[
      \Lambda^{(t)}_{5,\tau(g_2(y_0)),\,r_{g_2\cdot y_0}}
      \;\le\;
      \attn^{(t)}_{\ans,1\to\pred,2}
      -\attn^{(t)}_{\ans,1\to\ans,1}
      +\frac{L_k-1}{L_k}\,\ate^{L_k}.
    \]
    Let $\tilde T$ be the first time such that
    \[
      \E\Big[
        \attn^{(t)}_{\ans,1\to\pred,2}
        -\attn^{(t)}_{\ans,1\to\ans,1}
        +\frac{L_k-1}{L_k}\,\ate^{L_k}
        \;\Big|\;\tau(x_1)=s
      \Big]
      \;\le\; \frac{\log d}{4.02\,B}.
    \]
    Then, for any $y$,
    \begin{align*}
       & \Lambda^{(\tilde T)}_{5,\tau(g_2(y)),\,r_{g_2\cdot y}}
     \\
     & \;=\;
      2\big(\attn^{(\tilde T)}_{\ans,1\to\pred,2}
          +\!\!\sum_{\ell\neq 2}\!\attn^{(\tilde T)}_{\ans,1\to\pred,\ell}\,\1_{g_\ell(y)=g_2(y)}
          -\attn^{(\tilde T)}_{\ans,1\to\ans,0}\big)\,B
      \;\le\; \frac{\log d}{2.01}.
        \end{align*}

    Hence the corresponding wrong-class logit mass is small:
    \[
      \logit^{(1)}_{5,j_2}
      \;\le\; O\!\big(d^{-1.01/2.01}\big)\,\big(1-\logit^{(t)}_{5,j_2}\big).
    \]
    
    Plugging this into the gradient decomposition, we obtain
    \begin{align*}
      &\cN^{(\tilde T)}_{s,3,L_k,\rom 1}+\cN^{(\tilde T)}_{s,3,L_k,\rom 2} \\
      &\qquad
      = \attn^{(\tilde T)}_{\ans,1\to\pred,2}\,
        \Big(
          (1-\logit^{(\tilde T)}_{5,j_2})
          \big(V_{5,j_2,r_{g_2\cdot y_1},2}(g_2)-\Lambda^{(\tilde T)}_{5,j_2,r_{g_2\cdot y_1}}\big) \\
      &\qquad\qquad\qquad
          - O\!\big(d^{-1.01/2.01}\big)\,(1-\logit^{(\tilde T)}_{5,j_2})
            \sum_{y\neq y_0}\big(V_{5,j_2,r_{g_2\cdot y},2}(g_2)-\Lambda^{(\tilde T)}_{5,j_2,r_{g_2\cdot y}}\big)
        \Big)\,\1_{\tau(x_1)=s} \\
      &\qquad \ge
      \attn^{(\tilde T)}_{\ans,1\to\pred,2}\cdot
      \Omega(1)\cdot(1-\logit^{(\tilde T)}_{5,j_2})\cdot
      \big(V_{5,j_2,r_{g_2\cdot y_1},2}(g_2)-\Lambda^{(\tilde T)}_{5,j_2,r_{g_2\cdot y_1}}\big)\,\1_{\tau(x_1)=s},
    \end{align*}
    and
    \begin{align*}
      &\cN^{(\tilde T)}_{s,4,L_k,\rom 1}+\cN^{(\tilde T)}_{s,4,L_k,\rom 2} \\
      &\qquad
      = \attn^{(\tilde T)}_{\ans,1\to\pred,2}\,
        \Big(
          (1-\logit^{(\tilde T)}_{5,j_2})
          \big(V_{5,j_2,r_{g_2\cdot y_1},2}(y_1)-\Lambda^{(\tilde T)}_{5,j_2,r_{g_2\cdot y_1}}\big) \\
      &\qquad\qquad\qquad
          - O\!\big(d^{-1.01/2.01}\big)\,(1-\logit^{(\tilde T)}_{5,j_2})
            \sum_{y\neq y_0}\big(V_{5,j_2,r_{g_2\cdot y},2}(y_1)-\Lambda^{(\tilde T)}_{5,j_2,r_{g_2\cdot y}}\big)
        \Big)\,\1_{\tau(x_1)=s} \\
      &\qquad \le
      -\,\attn^{(\tilde T)}_{\ans,1\to\pred,2}\cdot
      \Omega(1)\cdot(1-\logit^{(\tilde T)}_{5,j_2})\cdot
      \Lambda^{(\tilde T)}_{5,j_2,r_{g_2\cdot y_1}}\,\1_{\tau(x_1)=s}.
    \end{align*}
    Therefore,
    \begin{align}
      \big[-\nabla_{\Q^{(\tilde T)}_{4,3}}\Loss_{5}^{2,2}\big]_{s,s}
      &\;\ge\;
      \Omega\!\left(\frac{B}{d}\right)\cdot
      \E\big[\,1-\logit^{(\tilde T)}_{5,j_2}\,\big|\,\tau(x_1)=s,\,\cE_{L_k,2}^c\big], \\
      \big[-\nabla_{\Q^{(\tilde T)}_{4,4}}\Loss_{5}^{2,2}\big]_{s,s}
      &\;\le\;
      -\,\Omega\!\left(\frac{B}{d}\right)\cdot
      \E\big[\,1-\logit^{(\tilde T)}_{5,j_2}\,\big|\,\tau(x_1)=s,\,\cE_{L_k,2}^c\big].
    \end{align}
    
    Finally, recall that
    \[
      \attn^{(t)}_{\ans,1\to\pred,2}
      +\attn^{(t)}_{\ans,1\to\pred,1}
      -\attn^{(t)}_{\ans,1\to\ans,1}
      \;=\;
      \frac{e^{[\Q^{(t)}_{4,3}]_{s,s}}-e^{[\Q^{(t)}_{4,4}]_{s,s}}-e^{\tilde{O}(1/d)}}
           {e^{[\Q^{(t)}_{4,3}]_{s,s}}+e^{[\Q^{(t)}_{4,4}]_{s,s}}+e^{\tilde{O}(1/d)}}.
    \]
    Combining the two gradient inequalities shows that the quantity
    \[
      \attn^{(t)}_{\ans,1\to\pred,2}
      +\frac{L_k-1}{L_k}\,\ate^{L_k}
      -\attn^{(t)}_{\ans,1\to\ans,1}
    \]
    must strictly decrease at time $\tilde T+1$.
    \end{proof}

  \subsubsection{At the end of stage}
\begin{lemma}[At the end of stage]\label{lem-end-rec} \Cref{induction-s22-rec} holds for all iterations $T_{k-1}<t\leq T_k={O}(\frac{\poly(d)}{\eta B})$. At the end of stage $k$, we have
    \begin{enumerate}[(a)]
        \item Attention concentration: $\ea^{L_k}\leq 5.04c_1$ for some small constant $c_1=\frac{1.005\log d}{B}>0$;
        \item Loss convergence: $\Loss^{L_k,2}_{F^{(T_{k-1})},5}\leq e^{(-\frac{1}{2}+3.01c_1)2B}=\frac{1}{\poly d}$.
    \end{enumerate}
\end{lemma}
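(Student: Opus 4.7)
The plan is to mirror the proof of \Cref{lem-end-sym} at the base stage, importing the recursive-stage gradient bounds and gap controls developed just above. First I would verify that the four items of \Cref{induction-s22-rec} are preserved along the trajectory initialized from $F^{(T_{k-1})}$: monotone growth of $[\Qb^{(t)}_{4,3}]_{s,s}+[\Qb^{(t)}_{4,4}]_{s,s}$ follows from the strictly positive gradient sum in \Cref{lem-s22-gd1-rec}; the two attention-gap constraints (2) and (3) are enforced by the standard counter-argument used in \Cref{lem-s22-attention-gap-1-rec} and \Cref{lem-s22-attention-gap-2-rec} (at the first violation time the corresponding gradient reverses sign and pushes the process back into the safe region); and the off-diagonal smallness (4) follows from \Cref{lem-s22-gd2-rec} together with the initialization inherited from stage $k{-}1$, since an $O(1/d)$ relative gradient cannot change the ratio by more than a constant within $\poly(d)/\eta$ iterations. \Cref{lem-s22-attn-rec} then gives the required $\ate^{L_k}\geq \tfrac{4}{3}c_1$ as long as the loss target has not been met.

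For convergence, I would combine \Cref{lem-s22-gd1-rec} with the chain rule $\Loss^{L_k,2}_{F^{(T_{k-1})},5}(t{+}1)-\Loss^{L_k,2}_{F^{(T_{k-1})},5}(t)\leq -\eta \cdot (\textrm{gradient norm}^2)$ to get a per-step loss decrease of order
\[
\eta\cdot\Omega(B/d)\cdot \E\!\big[\ate^{L_k}\cdot (1-\logit^{(t)}_{5,j_2})\mid \tau(x_1)=s,\ \cE_{L_k,1}\cap\cE_{L_k,2}^c\big].
\]
As long as the loss exceeds $e^{(-\tfrac12+3.01c_1)2B}$, the induction forces $\ate^{L_k}\geq \tfrac{4}{3}c_1$, and a worst-case logit lower bound of the form $1-\logit^{(t)}_{5,j_2}\geq \Omega(\Loss^{L_k,2}_{F^{(T_{k-1})},5})$ on $\cE_{L_k,1}\cap\cE_{L_k,2}^c$ turns this into a geometric-type decrease; summing over $t$ gives $T_k-T_{k-1}=O(\poly(d)/(\eta B))$. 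Finally, to extract the attention concentration $\ea^{L_k}\leq 5.04c_1$, I would run the worst-case-sample computation from the proof of \Cref{lem-end-sym}: on $\cE_{L_k,1}\cap\cE_{L_k,2}^c$, where at most one distractor index $\ell\neq 2$ can satisfy $g_\ell(y_1)=g_2(y_1)$, the log-probability of the correct token is bounded below by $\Theta(1)\cdot\exp\!\big(-(\tfrac12-\tfrac{3}{2}\ate^{L_k}-c_1)2B\big)$, and equating this with the convergence threshold $e^{(-\tfrac12+3.01c_1)2B}$ yields the stated constant $5.04c_1$ (the extra factor relative to the base stage comes from the broader $\cE_{L_k,1}$ allowance $U_k=\Theta(1)$ rather than a strict $\cE_1$).

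The hard part will be handling the shift from the ground-truth LEGO distribution to the bootstrapped distribution $\cD_{F^{(T_{k-1})}}^{L_k,L_k}$ that implicitly appears in $\Loss^{L_k,2}_{F^{(T_{k-1})},5}$. I would address this by leveraging \Cref{fact:gradients-Q-sym-recursive} and \Cref{remark-fact-1-rec} to reduce the gradient and the loss to their ground-truth counterparts, justified by the constant-factor length-doubling bound $\ate^{L_k}\leq 2\ate^{L_{k-1}}$ inherited from the previous stage which guarantees that the greedy annotator $\widehat{p}_{F^{(T_{k-1})}}$ matches the true labels on all relevant tokens. The remaining delicate piece is controlling the complement $\cE_{L_k,1}^c\cup\cE_{L_k,2}$: here I would plug the tail bound $\Pr(\Zb^{L_k,1}\notin\cE_{L_k,1})\leq \exp(-\Omega(n_y\log n_y))$ together with the logit clipping in \Cref{assumption:output-bound}, so that any distractor-induced positive contribution to $\Lambda^{(t)}_{5,j_2,r_{g_2\cdot y_1}}$ on this rare event is absolutely dominated by the $\Omega(B/d)$ main term from $\cE_{L_k,1}\cap\cE_{L_k,2}^c$ in \Cref{lem-s22-gd1-rec}, keeping both the descent and the barrier arguments intact throughout $[T_{k-1},T_k]$.
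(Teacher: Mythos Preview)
Your overall plan matches the paper: verify \Cref{induction-s22-rec} via \Cref{lem-s22-gd1-rec}, \Cref{lem-s22-gd2-rec}, \Cref{lem-s22-attention-gap-1-rec}, \Cref{lem-s22-attention-gap-2-rec}, read off the stopping time, and then extract the attention-concentration bound at convergence by comparing a lower bound on the loss with the threshold. The first two parts are fine.

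The gap is in the last step. First, the expression you invoke, $\Theta(1)\cdot\exp\!\big(-(\tfrac12-\tfrac32\ate^{L_k}-c_1)2B\big)$, is exactly the \emph{upper} bound on the per-sample loss computed in \Cref{lem-s22-attn-rec} for the worst-case sample (all predicate clauses agree with $g_2$ on $y_0$); it is not a lower bound on the loss for samples in $\cE_{L_k,1}\cap\cE_{L_k,2}^c$. Equating it with $e^{(-\tfrac12+3.01c_1)2B}$ gives $\ate^{L_k}\approx 1.34c_1$, not $5.04c_1$, so your arithmetic does not recover the stated constant. Second, your claim that on $\cE_{L_k,1}\cap\cE_{L_k,2}^c$ ``at most one distractor index $\ell\neq 2$ can satisfy $g_\ell(y_1)=g_2(y_1)$'' is false for general $L_k$: the event $\cE_{L_k,1}$ only bounds the fraction of such indices by $U_k=\Theta(1)$, which allows roughly $L_k/8$ distractors.

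The paper instead works on the event $\cE_{L_k,3}$, which bounds the \emph{maximum} over $y\in\cY$ of the number of collisions by $W_k$, and performs a case split. For $L_k=O(1)$ the collision count $W_k$ is an absolute constant, and for $L_k=\omega(1)$ one has $W_k/L_k=o(1)$; in both regimes the distractor contribution to $\Lambda^{(t)}_{5,j_2,r_{g_2\cdot y_1}}$ is negligible compared with $\attn^{(t)}_{\ans,1\to\pred,2}$, yielding the loss lower bound $\Theta\!\big(\exp(-(\tfrac12+\tfrac{c_1}{2}-\tfrac{\ate^{L_k}}{2}-\tfrac{\log d}{B})2B)\big)$. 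Comparing this with the threshold then gives $\ate^{L_k}\leq 2(3.02c_1+0.5c_1-c_1)=5.04c_1$. The use of $\cE_{L_k,3}$ rather than $\cE_{L_k,1}$, and the accompanying $L_k$-dependent case analysis, are the ingredients you are missing.
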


\begin{proof}
    By \Cref{lem-s22-attn-rec,lem-s22-gd1-rec}, the diagonal entries
    $[\Qb^{(t)}_{4,3}]_{s,s}$ and $[\Qb^{(t)}_{4,4}]_{s,s}$ keep increasing
    until the total wrong-attention mass satisfies $\ate \ge \tfrac{4c_1}{3}$.
    Combining \Cref{lem-s22-gd1-rec,lem-s22-gd2-rec,lem-s22-attention-gap-1-rec,lem-s22-attention-gap-2-rec},
    there exists a stopping time
    \[
      T_k \;=\; O\!\left(\frac{\poly(d)}{\eta B}\right),
    \]
    such that \Cref{induction-s22-rec} holds for all $t\in[T_1,\,T_k)$.
    
    \medskip\noindent
   To obtain an upper bound on $\ate^{L_k}$ at the end of stage $k$, it suffices to work on the main event $\cE_{L_k,3}$.
    
    \smallskip
    \emph{Case 1: $L_k=O(1)$.}
    For each $y\in\cY$, at most one index $\ell\neq 2$ can satisfy $g_\ell(y)=g_2(y)$,
    and thus the number of collisions per $y$ is uniformly bounded by $W_k$.
    Hence,
    \begin{align*}
      &\log p_F\!\big(Z_{\ans,2,5}\mid Z^{(2,1)}\big)\\
      &\ge \Theta(1)\cdot
         \exp\!\left(
           \log d - \big(\attn^{(t)}_{\ans,1\to\pred,2}
           + \sum_{\ell\neq 2}\!\attn^{(t)}_{\ans,1\to\pred,\ell}\,\1_{g_\ell(y_1)=g_2(y_1)}
           - \attn^{(t)}_{\ans,1\to\ans,0}\big)\,2B
         \right) \\
      &\ge \Theta(1)\cdot
         \exp\!\left(
           \log d - \big(\attn^{(t)}_{\ans,1\to\pred,2}
           + (1/L_k-1/L_k)\,\ate^{L_k}\big)\,2B
         \right) \\
      &\ge \Theta\!\left(
           \exp\!\big(-(\tfrac{1}{2}+\tfrac{c_1}{2}-\tfrac{\ate^{L_k}}{2}-\tfrac{\log d}{B})\,2B\big)
         \right),
    \end{align*}
    where the last line uses
    $\attn^{(t)}_{\ans,1\to\pred,2}\le \tfrac{1}{2}(1-\ate^{L_k}+c_1)
    = \tfrac{1+c_1}{2}-\tfrac{\ate^{L_k}}{2}$.
    
    \smallskip
    \emph{Case 2: $L_k=\omega(1)$.}
    If $L_k\le n_y\log n_y$ then $W_k/L_k=O(1/L_k)$; otherwise $W_k/L_k=O(1/n_y)$.
    In either regime $W_k/L_k=o(1)$, and we similarly obtain
    \begin{align*}
      \log p_F\!\big(Z_{\ans,2,5}\mid Z^{(2,1)}\big)
      &\ge \Theta(1)\cdot
         \exp\!\left(
           \log d - \big(\attn^{(t)}_{\ans,1\to\pred,2}
           + (o(1)-1/L_k)\,\ate^{L_k}\big)\,2B
         \right) \\
      &\ge \Theta\!\left(
           \exp\!\big(-(\tfrac{1}{2}+\tfrac{c_1}{2}-\tfrac{\ate^{L_k}}{2}-\tfrac{\log d}{B})\,2B\big)
         \right).
    \end{align*}
    
    \medskip
    At convergence we therefore have
    \[
      \tfrac{1}{2}+\tfrac{c_1}{2}-\tfrac{\ate^{L_k}}{2}-\tfrac{\log d}{B}
      \;\;\ge\;\; \tfrac{1}{2}-3.01\,c_1,
    \]
    which rearranges to
    \[
      \ate^{L_k}\leq 2(3.02c_1+0.5c_1-c_1)=5.04c_1.
    \]
    In particular, since $B/\log d$ can be taken as a sufficiently large constant,
    we obtain the clean stage-end bound
    \[
      \ate^{L_k} \;=\; O(c_1).
    \]
    \end{proof}
    
\subsection{Proof of Main Theorem}
\begin{theorem}[Recursive self-training (Restatement)]\label{thm:length-gen-self-training-restate}
Assume the distribution \(\cD^{L}\) induced from \(\lego(\cX,\cG,\cY)\) satisfies \Cref{assump:asymptotic-regime}, \ref{assump:lego-data-distribution} and \ref{assump:structure-2}, and assume the transformer network satisfies Assumption~\ref{assump:init}, \ref{assumption:output-bound}, \ref{assump-Q-structure}, and $n_y<m\ll \log^2 d$. Then for any $1 \leq k < \log_2|\cX|$, the transformer $F^{(T_{k})} $ trained via Algorithm~\ref{alg:cot-symmetry-training} up to length \(L_{k} = 2^{k}\) and \(T_k = O(\frac{\poly(d)}{\eta})\) satisfies: \(F^{(T_k)}\) is able to solve \(\cT^{L_{k+1}}\) with \({L_{k+1}}=2^{k+1}\), i.e., 
    $$
    \mathrm{Acc}_{L_{k+1}}\bigl(F^{(T_{k})}\bigr) = 1 - O(1 /\poly(d)).
    $$
    \end{theorem}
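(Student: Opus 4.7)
The plan is to prove the theorem inductively on $k$, leveraging the stage-end characterization from Lemma~\ref{lem-end-rec} and arguing that the doubling of context length only incurs a constant-factor degradation in attention concentration, which is still enough to preserve a polynomial loss bound at every position of the length-$L_{k+1}$ chain. The key structural fact to exploit is that, at $t=T_k$, the diagonal entries $[\Qb^{(T_k)}_{4,3}]_{s,s}$ and $[\Qb^{(T_k)}_{4,4}]_{s,s}$ have grown to $\Omega(\log d)$ for every $s\in\tau(\cX)$, while all other entries of the attention blocks remain $\tilde O(1/d)$ relative to the diagonal. These structural properties are independent of chain length.

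The first step is to convert the stage-end attention structure into a uniform, position-wise concentration bound at length $L_{k+1}$. Fix any position $\ell\in[L_{k+1}]$ and an input $\Zb^{L_{k+1},\ell-1}$ with $\tau(x_{\ell-1})=s$. By the block-sparsity of $\Qb$ (\Cref{assump-Q-structure}) and the ``variables sampled without replacement'' property of $\cD^{L_{k+1}}$, exactly one predicate clause carries $x_{\ell-1}$ in its third slot (namely $\Zb_{\pred,\ell}$) and exactly one answer clause carries $x_{\ell-1}$ in its fourth slot (namely $\Zb_{\ans,\ell-1}$ itself). Consequently, the attention scores from $\Zb_{\ans,\ell-1}$ take the two large values $[\Qb_{4,3}]_{s,s},[\Qb_{4,4}]_{s,s}$ on these two clauses and only $\tilde O(1/d)$ on every one of the remaining $2(L_{k+1}-1)$ distractor clauses. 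A direct softmax computation then yields
\begin{equation*}
\ate^{L_{k+1},\ell}\;\le\; \frac{O(L_{k+1})}{O(L_{k+1})+2\,e^{\Omega(\log d)}} \;\le\; 2\,\ate^{L_k} \;\le\; O(c_1),
\end{equation*}
uniformly in $\ell$, with the corresponding attention gap $\Delta^{L_{k+1},\ell}$ no larger than the stage-$k$ gap $\Delta^{L_k}$.

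The second step is a per-position logit bound. Under teacher forcing, the prefix $\Zb^{L_{k+1},\ell-1}\sim\cD^{L_{k+1},\ell-1}$ contains the correct predicate $\Zb_{\pred,\ell}$ and the correct current value $y_{\ell-1}$ in $\Zb_{\ans,\ell-1}$, so the FFN feature shape from \Cref{thm:learning-symmetric-actions} applies verbatim. Plugging the Step-1 concentration and gap bounds into the same activation--to--logit calculation used in the proof of \Cref{lem-end-rec}(b), the pre-softmax logit for the correct class $j_\ell=\tau(g_\ell(y_{\ell-1}))$ satisfies $[F_{5}(\Zb^{L_{k+1},\ell-1})]_{j_\ell}\ge (1-2\ate^{L_{k+1},\ell})\cdot 2B - O(\delta)$, while for any wrong class the logit is at most $O(c_1)\cdot 2B + O(\delta)$. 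With $B=C_B\log d$ and $c_1=\Theta(1/C_B)$, this gives $1-\logit_{5,j_\ell}^{(T_k)}(\Zb^{L_{k+1},\ell-1})\le 1/\poly(d)$. Averaging over $0\le\ell-1<L_{k+1}$ and over $Z^{L_{k+1}}\sim\cD^{L_{k+1}}$ as in the definition of $\mathrm{Acc}_{L_{k+1}}$ yields $\mathrm{Acc}_{L_{k+1}}(F^{(T_k)})\ge 1-1/\poly(d)$.

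The main obstacle is the uniform control of attention error and gap across \emph{all} positions $\ell\in[L_{k+1}]$, not just $\ell=2$ where the training loss was optimized. The stage-$k$ training only explicitly minimizes $\Loss^{L_k,2}_{F^{(T_{k-1})},5}$, so one must argue that the learned $\Qb$ structure, encoded by the large diagonal and small off-diagonal entries indexed by variables $s\in\tau(\cX)$, transfers to \emph{every} position of a length-$L_{k+1}$ chain. The argument for this hinges on two facts already established: the translation-invariance of the content-based retrieval mechanism (\Cref{lem-refined-grad-Q43-rec}--\Cref{lem-refined-grad-Q44-rec}), which does not privilege position $\ell=2$; and the ``variables without replacement'' property, which ensures that no spurious clause contests the query variable $x_{\ell-1}$. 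Once these points are combined with the stage-end bound $\ate^{L_k}=O(c_1)$, the doubling to $L_{k+1}$ at worst doubles the distractor mass, leaving $\ate^{L_{k+1},\ell}=O(c_1)$ uniformly and the corollary of polynomially small loss then follows.
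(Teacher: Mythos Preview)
Your proposal is correct and follows essentially the same approach as the paper's own proof: invoke the stage-end attention concentration from Lemma~\ref{lem-end-rec}, argue that doubling the chain length only multiplies the wrong-attention mass by a constant factor (the paper uses $4$ rather than your $2$, since at position $\ell=2$ there are $L_k$ distractors but at position $\ell=L_{k+1}$ there are $\approx 4L_k$), and then convert the resulting concentration and gap bounds into a $1/\poly(d)$ logit error via the FFN feature shape. Your explicit discussion of why the learned $\Qb$ structure transfers to all positions $\ell\in[L_{k+1}]$ (content-based retrieval plus variables-without-replacement) is a nice clarification that the paper leaves implicit.
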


    \begin{proof}
        By \Cref{lem-end-rec}, at the time $T_{k}$, we have $\ea^{L_{k}}\leq 5.04 c_1$,  combining with \Cref{induction-s22-rec}, non-diagonal entry of $\Qb_{p,q}$ remains close to $0$, thus moving to the next stage, we have $\ea^{L_{k+1},\ell}\leq 4\ea^{L_{k}}\leq 20.16 c_1$ (notice that $20.16c_1$ could be sufficiently small e.g., $0.01$ since we can set a reasonably large B) for all $\ell<L_{k+1}$. Moreover, $\Delta^{L_{k+1},\ell}\leq \Delta^{L_{k},1}$. Hence, we have
\begin{align*}
    &\E_{Z^{L_{k+1}}\sim \cD^{L_{k+1}}}\bigg[\E_{ \hat{Z}_{\ans,\ell+1} \sim p_F^{(T_k)}(\cdot |Z^{L_{k+1},\ell }) } [\1_{\{\hat{Z}_{\ans,\ell+1} \neq Z_{\ans,\ell+1}\}}]\bigg]\\
    &\leq O(1)\cdot \E_{Z^{L_{k+1}}\sim \cD^{L_{k+1}}}\bigg[1-\logit_{5,\tau(\Zb_{\ans,\ell+1, 5})}\bigg]\\
    &\leq O(1)\cdot e^{\big(-(\frac{1}{2}-\ate^{L_{k+1}}-(\ate^{L_{k+1}}-c_2)/2-\ate^{L_{k+1}})\big)\cdot 2B}\\
    &\leq O(1)\cdot e^{(-\frac{1}{2}-\frac{\log d}{4B}+\frac{5}{2}\cdot 0.01)2B}=O\Big(\frac{1}{\poly d}\Big).
\end{align*}
Thus $\mathrm{Acc}_{L_{k+1}}(F^{T_k})\geq 1-O\big(\frac{1}{\poly d}\big)$.
    \end{proof}

\stopcontents[sections]

\end{document}